\renewcommand{\arraystretch}{1.15}
\numberwithin{equation}{chapter}
\newcommand*\circled[1]{\tikz[baseline=(char.base)]{\node[shape=circle,draw,inner sep=2pt] (char) {#1};}}
\tikzstyle{condition}=[rectangle, draw=black, rounded corners, fill=colorqr, drop shadow,
\tikzstyle{abstract}=[rectangle, draw=black, rounded corners, fill=blue!30, drop shadow,
\tikzstyle{comment}=[rectangle, draw=black, rounded corners, fill=color1, drop shadow,
\tikzstyle{myarrow}=[->, >=open triangle 90, thick]
\tikzstyle{line}=[-, thick]
\newlength{\offsetpage}
\let\cleardoublepage\clearpage
\definecolor{titlepagecolor}{cmyk}{75,68,67,90}
\definecolor{titlepagecolor2}{rgb}{1.0, 0.08, 0.58}
\definecolor{emerald}{rgb}{0.31, 0.78, 0.47}
\definecolor{deeppink}{HTML}{D14064}
\definecolor{lowpink}{HTML}{ffe6ec}
\newcommand{\partcolor}{gray!65} 
\definecolor{lowblue}{HTML}{E1EBFE}
\renewcommand*\cleardoublepage{
	\clearpage
	\if@twoside   
	\ifodd\c@page 
	\hbox{}\newpage
	\if@twocolumn\hbox{}   
	\newpage
	\fi
	\fi
	\fi
} \makeatother
\let\originalpart=\part
\def\part#1{\cleardoublepage\clearpage \pagecolor{\partcolor} \originalpart{#1}\nopagecolor }
\newenvironment{sbmatrix}[1]{\def\mysubscript{#1}\mathop\bgroup\begin{bmatrix}}{\end{bmatrix}\egroup_{\textstyle\mathstrut\mysubscript}}
\newenvironment{bmatrixfoot}
{\footnotesize\begin{bmatrix}}
	{\end{bmatrix}\normalsize}
\newcommand{\new}{\text{new}}
\newcommand{\MAP}{\text{MAP}}
\newcommand{\ML}{\text{ML}}
\newcommand{\AIC}{\text{AIC}}
\newcommand{\BIC}{\text{BIC}}
\newcommand{\CI}{\text{CI}}
\newcommand{\cond}{\text{cond} }
\newcommand{\holders}{{H\"older's }}
\newcommand{\topone}{{(1)}}
\newcommand{\toptminus}{{(t-1)}}
\newcommand{\toptzero}{{(t)}}
\newcommand{\toptzeroTOP}{{(t)\top}}
\newcommand{\toptone}{{(t+1)}}
\newcommand{\dom}{\mathrm{dom}}
\newcommand{\interior}{\text{int}}
\newcommand\comple[1]{#1^c}
\newcommand{\LR}{\mathrm{LR}}
\newcommand{\LLR}{\mathrm{LLR}}
\newcommand{\Deviance}{\mathrm{D}}
\newcommand{\hadaprod}{\circ}
\renewcommand{\l@section}{\@dottedtocline{1}{1.5em}{2.2em}}
\renewcommand{\l@subsection}{\@dottedtocline{2}{4.0em}{3.2em}}
\renewcommand{\l@subsubsection}{\@dottedtocline{3}{7.1em}{4.3em}}
\newcommand\myhrulefill[1]{\leavevmode\leaders\hrule height#1\hfill\kern0pt}
\def\algoalign#1{\parbox[t]{\dimexpr\linewidth-\algorithmicindent}{#1}}
\definecolor{caligraphcolor}{HTML}{74AECB}
\newcommand*{\eitemi}{\tikz \draw [baseline, ball color=structurecolor,draw=none] circle (2pt);}
\newcommand*{\eitemii}{\tikz \draw [baseline, fill=structurecolor,draw=none,circular drop shadow] circle (2pt);}
\newcommand*{\eitemiii}{\tikz \draw [baseline, fill=structurecolor,draw=none] circle (2pt);}
\setlist[enumerate,1]{label=\color{black}\arabic*.,itemsep=0pt,partopsep=0pt,parsep=\parskip,topsep=5pt}
\setlist[enumerate,2]{label=\color{black}(\alph*).,itemsep=0pt,partopsep=0pt,parsep=\parskip,topsep=5pt}
\setlist[enumerate,3]{label=\color{black}\Roman*.,itemsep=0pt,partopsep=0pt,parsep=\parskip,topsep=5pt}
\setlist[enumerate,4]{label=\color{black}\Alph*.,itemsep=0pt,partopsep=0pt,parsep=\parskip,topsep=5pt}
\setlist[itemize,1]{label={\eitemi},itemsep=0pt,partopsep=0pt,parsep=\parskip,topsep=5pt}
\setlist[itemize,2]{label={\eitemii},itemsep=0pt,partopsep=0pt,parsep=\parskip,topsep=5pt}
\setlist[itemize,3]{label={\eitemiii},itemsep=0pt,partopsep=0pt,parsep=\parskip,topsep=5pt}
\def\mghrulefill#1{\color{black}\leavevmode\leaders\hrule\@height #1\hfill\kern\z@}
\definecolor{chaptertitle}{RGB}{0,0,128}
\definecolor{chapternum}{RGB}{255,0,0}
\newcommand{\changefonts}{%
	\fontsize{9}{11}\selectfont
}
\newtheoremstyle{normalfontstyle} 
{3pt}                           
{3pt}                           
{\normalfont}                   
{}                              
{\bfseries}                     
{}                             
{ }                             
{}                              
\declaretheoremstyle[
spaceabove=3pt,
spacebelow=3pt,
headfont=\bfseries,
notefont=\bfseries, 
notebraces={(}{)}, 
bodyfont=\normalfont,
postheadspace=1em, 
]{normalfontboldhead}
\theoremstyle{normalfontstyle}
\newcommand{\BlackBox}{\rule{1.5ex}{1.5ex}}  
\renewenvironment{proof}{\par\noindent{\bf Proof\ }}{\hfill\BlackBox\\[2mm]}
\declaretheorem[style=normalfontboldhead, name=Definition, numberlike=theo]{definitionT}
\newmdenv[skipabove=7pt,
skipbelow=7pt,
rightline=false,
leftline=true,
topline=false,
bottomline=false,
linecolor=mydarkblue,
innerleftmargin=5pt,
innerrightmargin=5pt,
innertopmargin=0pt,
leftmargin=2cm,
rightmargin=0cm,
linewidth=4pt,
innerbottommargin=0pt]{dBox}
\newenvironment{definition}{\begin{dBox}\begin{definitionT}}{\end{definitionT}\end{dBox}}
\declaretheorem[style=normalfontboldhead, name=Exercise, numberlike=theo]{exerciseC}
\newmdenv[skipabove=7pt,
skipbelow=7pt,
rightline=false,
leftline=true,
topline=false,
bottomline=false,
linecolor=mydarkgreen,
innerleftmargin=5pt,
innerrightmargin=5pt,
innertopmargin=0pt,
leftmargin=2cm,
rightmargin=0cm,
linewidth=4pt,
innerbottommargin=0pt]{eBox}
\newenvironment{exercise}{\begin{eBox}\begin{exerciseC}}{\end{exerciseC}\end{eBox}}
\declaretheorem[style=normalfontboldhead, name=Remark, numberlike=theo]{remarekC}
\newmdenv[skipabove=7pt,
skipbelow=7pt,
rightline=false,
leftline=true,
topline=false,
bottomline=false,
linecolor=mydarkpurple,
innerleftmargin=5pt,
innerrightmargin=5pt,
innertopmargin=0pt,
leftmargin=2cm,
rightmargin=0cm,
linewidth=4pt,
innerbottommargin=0pt]{rBox}
\newenvironment{remark}{\begin{rBox}\begin{remarekC}}{\end{remarekC}\end{rBox}}
\declaretheorem[style=normalfontboldhead, name=Assumption, numberlike=theo]{assumptionC}
\newmdenv[skipabove=7pt,
skipbelow=7pt,
rightline=false,
leftline=true,
topline=false,
bottomline=false,
linecolor=mydarkpurple,
innerleftmargin=5pt,
innerrightmargin=5pt,
innertopmargin=0pt,
leftmargin=2cm,
rightmargin=0cm,
linewidth=4pt,
innerbottommargin=0pt]{asBox}
\newenvironment{assumption}{\begin{asBox}\begin{assumptionC}}{\end{assumptionC}\end{asBox}}
\declaretheorem[style=normalfontboldhead, name=Example, numberlike=theo]{exampleC}
\newmdenv[skipabove=7pt,
skipbelow=7pt,
rightline=false,
leftline=false,
topline=false,
bottomline=false,
linecolor=mydarkgreen,
innerleftmargin=1pt,
innerrightmargin=5pt,
innertopmargin=0pt,
leftmargin=2cm,
rightmargin=0cm,
linewidth=4pt,
innerbottommargin=0pt]{xBox}
\newenvironment{example}{\begin{xBox}\begin{exampleC}}{\exampbar\end{exampleC}\end{xBox}}
\newcommand{\xchaptertitle}{Chapter~\thechapter~}
\newcommand{\problemname}{Problems}
\newenvironment{problemset}[1][\xchaptertitle~\problemname]{
	\vspace*{10pt}
	\begin{center}
		\phantomsection\addcontentsline{toc}{section}{\texorpdfstring{\xchaptertitle~\problemname}{\problemname}}
		\markright{#1}
		\textcolor{structurecolor}{\Large\bfseries\adftripleflourishleft~#1~\adftripleflourishright}
	\end{center}
	\begin{enumerate}[ref=\thechapter.\theenumi]}{
\end{enumerate}}
\definecolor{winestain}{rgb}{0.5,0,0}
\definecolor{colorGreenOcre}{RGB}{51,102,0} 
\definecolor{colorBlue2}{RGB}{200,207,248}
\definecolor{mydarkblue}{rgb}{0,0.08,0.45}
\newcommand{\mdframecolor}{gray!10}
\newcommand{\mdframehideline}{true}
\definecolor{mylightbluetitle}{RGB}{60,113,183}
\definecolor{mylightbluetext}{rgb}{0,0.08,0.45}
\definecolor{structurecolorblue}{RGB}{60,113,183}
\definecolor{structurecolorgreen}{RGB}{63,145,182}
\colorlet{structurecolor}{structurecolorblue}
\definecolor{structurecolorelegant}{RGB}{60,113,183}
\definecolor{structurecolorlt}{RGB}{31,119,185}
\definecolor{structurecolorHighTheoremBlue}{RGB}{220,227,248}
\definecolor{structurecolorHighTheoremGreen}{RGB}{188,222,231}
\colorlet{structurecolorHighTheorem}{structurecolorHighTheoremBlue}
\definecolor{mdframecolorRemark}{RGB}{186,94,103}
\definecolor{mydarkblue}{rgb}{0,0.08,0.45}
\definecolor{mydarkred}{rgb}{0.70,0.00,0.00}
\definecolor{mydarkgreen}{rgb}{0.00,0.30,0.00}
\definecolor{mydarkyellow}{RGB}{197,151,13}
\definecolor{mydarkpurple}{RGB}{90,35,140}
\definecolor{mydarkgray}{RGB}{64,64,64}
\definecolor{color0}  {RGB}{174,225,254} 
\definecolor{color1}  {RGB}{220,227,248} 
\definecolor{color2}  {RGB}{28,130,185} 
\definecolor{color3}  {RGB}{255,253,250} 
\definecolor{colormiddleright}  {RGB}{245,253,250} 
\definecolor{colorbottomleft}  {RGB}{255,243,250} 
\definecolor{coloruppermiddle}  {RGB}{255,253,230} 
\definecolor{colormiddleleft}  {RGB}{255,244,237}
\definecolor{colorcr}  {RGB}{249,253,232} 
\definecolor{colorreduction}  {RGB}{255,235,254} 
\definecolor{colorqr}  {RGB}{254,221,199} 
\definecolor{colorbiconjugate}  {RGB}{251,149,161} 
\definecolor{colorsvd}  {RGB}{215,247,235} 
\definecolor{colorupperright}  {RGB}{239,246,251} 
\definecolor{colorspectral}  {RGB}{206,226,243} 
\definecolor{colorbottomright}  {RGB}{220,224,236} 
\definecolor{coloreigenvalue}  {RGB}{197,203,224} 
\definecolor{colorcp} {RGB}{217, 234, 186} 
\definecolor{colorcpborder} {RGB}{233, 243, 216} 
\definecolor{colorupperleft}  {RGB}{235,243,240} 
\definecolor{colorsemidefinite}  {RGB}{217,232,226} 
\definecolor{colormiddle} {RGB}{235, 240,255}
\definecolor{colorlu}  {RGB}{220,227,255} 
\definecolor{colorals}  {RGB}{240,230,255} 
\definecolor{coloralsbkg}  {RGB}{248,243,255} 
\definecolor{canaryyellow}{rgb}{1.0, 0.75, 0.0}
\definecolor{bluepigment}{rgb}{0.0, 0.0, 1.0}
\definecolor{canarypurple}{RGB}{208, 13, 241}
\definecolor{colorGreenOcre}{RGB}{51,102,0} 
\definecolor{colorBlue1}  {RGB}{220,227,248}
\definecolor{colorBlue2}{RGB}{200,207,248}
\definecolor{shadecolor}{gray}{0.75}
\definecolor{color0}  {RGB}{174,225,254} 
\definecolor{color1}  {RGB}{220,227,248} 
\definecolor{color2}  {RGB}{28,130,185} 
\definecolor{color3}  {RGB}{255,253,250} 
\definecolor{color0}  {RGB}{174,225,254} 
\definecolor{color1}  {RGB}{220,227,248} 
\definecolor{color2}  {RGB}{28,130,185} 
\definecolor{color3}  {RGB}{255,253,250} 
\definecolor{colormiddleright}  {RGB}{245,253,250} 
\definecolor{colorbottomleft}  {RGB}{255,243,250} 
\definecolor{coloruppermiddle}  {RGB}{255,253,230} 
\definecolor{colormiddleleft}  {RGB}{255,244,237}
\definecolor{colorcr}  {RGB}{249,253,232} 
\definecolor{colorreduction}  {RGB}{255,235,254} 
\definecolor{colorqr}  {RGB}{254,221,199} 
\definecolor{colorbiconjugate}  {RGB}{251,149,161} 
\definecolor{colorsvd}  {RGB}{215,247,235} 
\definecolor{colorupperright}  {RGB}{239,246,251} 
\definecolor{colorspectral}  {RGB}{206,226,243} 
\definecolor{colorbottomright}  {RGB}{220,224,236} 
\definecolor{coloreigenvalue}  {RGB}{197,203,224} 
\definecolor{colorcp} {RGB}{217, 234, 186} 
\definecolor{colorcpborder} {RGB}{233, 243, 216} 
\definecolor{colorupperleft}  {RGB}{235,243,240} 
\definecolor{colorsemidefinite}  {RGB}{217,232,226} 
\definecolor{colormiddle} {RGB}{235, 240,255}
\definecolor{colorlu}  {RGB}{220,227,255} 
\definecolor{colorals}  {RGB}{240,230,255} 
\definecolor{coloralsbkg}  {RGB}{248,243,255} 
\definecolor{canaryyellow}{rgb}{1.0, 0.75, 0.0}
\definecolor{bluepigment}{rgb}{0.0, 0.0, 1.0}
\definecolor{canarypurple}{RGB}{208, 13, 241}
\definecolor{brightlavender}{rgb}{0.44, 0.16, 0.39}
\newcommand{\uniformdist}{\text{Uniform}}
\newcommand{\exponential}{\mathcal{E}}
\newcommand{\gammadist}{\mathcal{G}}
\newcommand{\inversegammadist}{\mathcal{G}^{-1}}
\newcommand{\studentt}{\tau}
\newcommand{\normal}{\mathcal{N}}
\newcommand{\laplacedist}{\mathcal{L}}
\newcommand{\betadist}{\mathrm{Beta}}
\newcommand{\wishartdist}{\mathrm{Wi}}
\newcommand{\inversewishart}{\mathrm{IW}}
\newcommand{\inversechidist}{\mathrm{\chi^{-2}}}
\newcommand{\normalinversegamma}{\mathcal{NIG}}
\newcommand{\nig}{\mathcal{NIG}}
\newcommand{\chisquared}{\chi^2}
\newcommand{\poissondist}{\mathcal{P}}
\newcommand{\real}{\mathbb{R}}
\newcommand{\prob}{\Pr}
\newcommand{\leadto}{\qquad\underrightarrow{ \text{leads to} }\qquad}
\newcommand{\leadtosmall}{\,\,\,\,\underrightarrow{ \text{leads to} }\,\,\,\,}
\mathchardef\mhyphen="2D
\newcommand{\integer}{\mathbb{Z}}
\newcommand{\gap}{\,\,\,\,\,\,\,\,}
\newcommand{\diag}{\mathrm{diag}}
\newcommand{\indicator}{\mathds{1}}
\newcommand{\bgast}{\boldsymbol{g}_\ast}
\newcommand{\multinomial}{\mathrm{Multi}}
\newcommand{\bernoulli}{\mathrm{Bern}}      
\newcommand{\bernoullidist}{\mathrm{Bern}}
\newcommand{\smu}{\mu}
\newcommand{\ssigma}{\sigma}
\newcommand{\tr}{\mathrm{tr}}
\newcommand{\binoimial}{\mathrm{Binom}}
\newcommand{\binomialdist}{\mathrm{Binom}}
\newcommand{\trans}[1]{\ensuremath{#1^{ \top}}}
\newcommand{\exampbar}{\hfill $\square$\par}
\newcommand{\Stackd}{\stackrel{d}{\longrightarrow}}
\newcommand{\Stackp}{\stackrel{p}{\longrightarrow}}
\newcommand{\bXast}{\boldsymbol{X}_\ast}
\newcommand{\cspace}{\mathcal{C}}
\newcommand{\nspace}{\mathcal{N}}
\newcommand{\Corr}{\mathbb{C}\mathrm{orr}}
\newcommand{\Cov}{\mathbb{C}\mathrm{ov}}
\newcommand{\cov}{\mathrm{Cov}}
\newcommand{\Exp}{\mathbb{E}}
\newcommand{\Var}{\mathbb{V}\mathrm{ar}}
\newcommand{\fisher}{\mathbb{I}}
\newcommand{\argmax}{\operatorname*{\text{arg max}}}
\newcommand{\argmin}{\operatorname*{\text{arg min}}} 
\newcommand\abs[1]{\left\lvert#1\right\rvert}
\newcommand\absbig[1]{\big\lvert#1\big\rvert}
\newcommand\norm[1]{\left\lVert#1\right\rVert}
\newcommand\normone[1]{\left\lVert#1\right\rVert_1}
\newcommand\normonebig[1]{\big\lVert#1\big\rVert_1}
\newcommand\normtwo[1]{\left\lVert#1\right\rVert_2}
\newcommand\normtwobig[1]{\big\lVert#1\big\rVert_2}
\newcommand\normp[1]{\left\lVert#1\right\rVert_p}
\newcommand\normf[1]{\left\lVert#1\right\rVert_F}
\newcommand\normfbig[1]{\big\lVert#1\big\rVert_F}
\newcommand\norminf[1]{\left\lVert#1\right\rVert_{\infty}}
\newcommand\norminfbig[1]{\big\lVert#1\big\rVert_{\infty}}
\newcommand\innerproduct[1]{\left\langle#1\right\rangle}
\newcommand{\ESS}{\mathrm{ESS}}
\newcommand{\RSS}{\mathrm{RSS}}
\newcommand{\TSS}{\mathrm{TSS}}
\newcommand{\bias}{\mathrm{Bias}}
\newcommand{\MSE}{\mathrm{MSE}}
\newcommand{\sgn}{\mathrm{sgn}}
\newcommand{\rank}{\mathrm{rank}}
\newcommand{\nnz}{\mathrm{nnz}}
\newcommand{\trace}{\mathrm{tr}}
\newcommand{\spn}{\mathrm{span}}
\newcommand{\gapthree}{\,\,\,}
\newcommand{\gapforall}{\,\,}
\mathchardef\mhyphen="2D
\newcommand{\complex}{\mathbb{C}}
\newcommand{\naturalset}{\mathbb{N}}
\def\1{\bm{1}}
\newcommand{\R}{\mathbb{R}}
\newcommand{\entropy}{\mathrm{H}}
\newcommand{\KL}{D_{\mathrm{KL}}}
\DeclareMathOperator{\sign}{sign}
\newcommand{\bzero}{\boldsymbol{0}}
\newcommand{\balpha}{\boldsymbol\alpha}
\newcommand{\bbeta}{\boldsymbol\beta}
\newcommand{\bdelta}{\boldsymbol\delta}
\newcommand{\bgamma}{\boldsymbol\gamma}
\newcommand{\bepsilon}{\boldsymbol\epsilon}
\newcommand{\boldeta}{\boldsymbol\eta}
\newcommand{\bphiast}{\bphi_\ast}
\newcommand{\bmu}{\boldsymbol\mu}
\newcommand{\bxi}{\boldsymbol\xi}
\newcommand{\bsigma}{\boldsymbol\sigma}
\newcommand{\bSigma}{\boldsymbol\Sigma}
\newcommand{\bOmega}{\boldsymbol\Omega}
\newcommand{\blambda}{\boldsymbol\lambda}
\newcommand{\bLambda}{\boldsymbol\Lambda}
\newcommand{\btheta}{\boldsymbol\theta}
\newcommand{\bphi}{\boldsymbol\phi}
\newcommand{\bPhi}{\boldsymbol\Phi}
\newcommand{\widehattheta}{\widehat{\theta}}
\newcommand{\widehatbtheta}{\widehat{\btheta}}
\newcommand{\widetildebtheta}{\widetilde{\btheta}}
\newcommand{\widehatbbeta}{\widehat{\bbeta}}
\newcommand{\widetildebbeta}{\widetilde{\bbeta}}
\newcommand{\widehatbalpha}{\widehat{\balpha}}
\newcommand{\widetildebalpha}{\widetilde{\balpha}}
\newcommand{\widehatbeta}{\widehat{\beta}}
\newcommand{\widehaty}{\widehat{y}}
\newcommand{\widehatbB}{\widehat{\bm{B}}}
\newcommand{\widehatbV}{\widehat{\bm{V}}}
\newcommand{\widehatbX}{\widehat{\bm{X}}}
\newcommand{\widehatba}{\widehat{\bm{a}}}
\newcommand{\widehatbb}{\widehat{\bm{b}}}
\newcommand{\widehatby}{\widehat{\bm{y}}}
\newcommand{\mathcalC}{\mathcal{C}}
\newcommand{\mathcalD}{\mathcal{D}}
\newcommand{\mathcalF}{\mathcal{F}}
\newcommand{\mathcalH}{\mathcal{H}}
\newcommand{\mathcalL}{\mathcal{L}}
\newcommand{\mathcalN}{\mathcal{N}}
\newcommand{\mathcalO}{\mathcal{O}}
\newcommand{\mathcalS}{\mathcal{S}}
\newcommand{\mathcalV}{\mathcal{V}}
\newcommand{\mathcalW}{\mathcal{W}}
\newcommand{\mathcalX}{\mathcal{X}}
\newcommand{\mathcalY}{\mathcal{Y}}
\newcommand{\widetildebA}{\widetilde{\bm{A}}}
\newcommand{\widetildebB}{\widetilde{\bm{B}}}
\newcommand{\widetildebC}{\widetilde{\bm{C}}}
\newcommand{\widetildebD}{\widetilde{\bm{D}}}
\newcommand{\widetildebP}{\widetilde{\bm{P}}}
\newcommand{\widetildebQ}{\widetilde{\bm{Q}}}
\newcommand{\widetildebR}{\widetilde{\bm{R}}}
\newcommand{\widetildebS}{\widetilde{\bm{S}}}
\newcommand{\widetildebW}{\widetilde{\bm{W}}}
\newcommand{\widetildebX}{\widetilde{\bm{X}}}
\newcommand{\widetildebZ}{\widetilde{\bm{Z}}}
\newcommand{\widetildeba}{\widetilde{\bm{a}}}
\newcommand{\widetildebd}{\widetilde{\bm{d}}}
\newcommand{\widetildebq}{\widetilde{\bm{q}}}
\newcommand{\widetildebu}{\widetilde{\bm{u}}}
\newcommand{\widetildebx}{\widetilde{\bm{x}}}
\newcommand{\widetildeby}{\widetilde{\bm{y}}}
\newcommand{\widetildebz}{\widetilde{\bm{z}}}
\newcommand{\widetildef}{\widetilde{f}}
\newcommand{\widetildep}{\widetilde{p}}
\newcommand{\bone}{\bm{1}}
\newcommand{\ba}{\bm{a}}
\newcommand{\bA}{\bm{A}}
\newcommand{\bb}{\bm{b}}
\newcommand{\bB}{\bm{B}}
\newcommand{\bc}{\bm{c}}
\newcommand{\bC}{\bm{C}}
\newcommand{\bd}{\bm{d}}
\newcommand{\bD}{\bm{D}}
\newcommand{\be}{\bm{e}}
\newcommand{\bE}{\bm{E}}
\newcommand{\bF}{\bm{F}}
\newcommand{\bg}{\bm{g}}
\newcommand{\bG}{\bm{G}}
\newcommand{\bh}{\bm{h}}
\newcommand{\bH}{\bm{H}}
\newcommand{\bI}{\bm{I}}
\newcommand{\bJ}{\bm{J}}
\newcommand{\bK}{\bm{K}}
\newcommand{\bl}{\bm{l}}
\newcommand{\bL}{\bm{L}}
\newcommand{\bmm}{\bm{m}}
\newcommand{\bM}{\bm{M}}
\newcommand{\bn}{\bm{n}}
\newcommand{\bo}{\bm{o}}
\newcommand{\bp}{\bm{p}}
\newcommand{\bP}{\bm{P}}
\newcommand{\bq}{\bm{q}}
\newcommand{\bQ}{\bm{Q}}
\newcommand{\br}{\bm{r}}
\newcommand{\bR}{\bm{R}}
\newcommand{\bs}{\bm{s}}
\newcommand{\bS}{\bm{S}}
\newcommand{\bt}{\bm{t}}
\newcommand{\bT}{\bm{T}}
\newcommand{\bu}{\bm{u}}
\newcommand{\bU}{\bm{U}}
\newcommand{\bv}{\bm{v}}
\newcommand{\bV}{\bm{V}}
\newcommand{\bw}{\bm{w}}
\newcommand{\bW}{\bm{W}}
\newcommand{\bx}{\bm{x}}
\newcommand{\bX}{\bm{X}}
\newcommand{\by}{\bm{y}}
\newcommand{\bY}{\bm{Y}}
\newcommand{\bz}{\bm{z}}
\newcommand{\bZ}{\bm{Z}}
\def\vmu{{\bm{\mu}}}
\def\vtheta{{\bm{\theta}}}
\def\va{{\bm{a}}}
\def\ve{{\bm{e}}}
\def\vx{{\bm{x}}}
\def\mA{{\bm{A}}}
\def\mB{{\bm{B}}}
\def\mH{{\bm{H}}}
\def\mJ{{\bm{J}}}
\def\mX{{\bm{X}}}
\def\mSigma{{\bm{\Sigma}}}
\DeclareMathAlphabet{\mathsfit}{\encodingdefault}{\sfdefault}{m}{sl}
\SetMathAlphabet{\mathsfit}{bold}{\encodingdefault}{\sfdefault}{bx}{n}
\def\sA{{\mathbb{A}}}
\def\sB{{\mathbb{B}}}
\def\sF{{\mathbb{F}}}
\def\sI{{\mathbb{I}}}
\def\sJ{{\mathbb{J}}}
\def\sK{{\mathbb{K}}}
\def\sN{{\mathbb{N}}}
\def\sS{{\mathbb{S}}}
\def\ra{{\textnormal{a}}}
\def\rb{{\textnormal{b}}}
\def\rc{{\textnormal{c}}}
\def\re{{\textnormal{e}}}
\def\rg{{\textnormal{g}}}
\def\rs{{\textnormal{s}}}
\def\rx{{\textnormal{x}}}
\def\ry{{\textnormal{y}}}
\def\rz{{\textnormal{z}}}
\def\rS{{\textnormal{S}}}
\def\rW{{\textnormal{W}}}
\def\rT{{\textnormal{T}}}
\def\rva{{\mathbf{a}}}
\def\rvb{{\mathbf{b}}}
\def\rve{{\mathbf{e}}}
\def\rvs{{\mathbf{s}}}
\def\rvv{{\mathbf{v}}}
\def\rvw{{\mathbf{w}}}
\def\rvx{{\mathbf{x}}}
\def\rvy{{\mathbf{y}}}
\def\rvz{{\mathbf{z}}}
\def\rmA{{\mathbf{A}}}
\def\rmB{{\mathbf{B}}}
\def\rmG{{\mathbf{G}}}
\def\rmM{{\mathbf{M}}}
\def\rmS{{\mathbf{S}}}
\def\rmX{{\mathbf{X}}}
\def\rmY{{\mathbf{Y}}}
\def\rmZ{{\mathbf{Z}}}
\def\eva{{a}}
\newcommand{\mytitle}{A Rigorous Introduction to Linear Models}
\begin{document}
\newpage
\thispagestyle{empty}  
\title{\mytitle}

\author{
\begin{center}
\name Jun Lu \\ 
\email jun.lu.locky@gmail.com
\end{center}
}

\frontmatter

\newpage 
\maketitle

\chapter*{\centering \begin{normalsize}Preface\end{normalsize}}

This book aims to provide an introduction to linear models and the theoretical foundations that underpin them. Our goal is to offer a rigorous treatment of the subject for readers who already have some familiarity with ordinary least squares (OLS) regression.

In machine learning, where outputs often involve nonlinear functions, and deep learning seeks to capture complex nonlinear relationships through multiple computational layers, the underlying principles still rest on simple linear models. This highlights the importance of understanding the theory and application of linear models as a basis for more advanced techniques.
The book then covers various aspects of linear models, with a particular emphasis on the method of least squares as the primary tool for solving regression problems. Least squares minimizes the sum of squared errors to estimate the regression function that yields the smallest expected squared error.

Primarily serving as a concise yet comprehensive overview, this book emphasizes the significance of key theoretical concepts behind linear models, including distribution theory, minimum variance estimation, and analysis of variance.
We begin with the ordinary least squares framework and explore it from multiple perspectives. We then introduce random disturbances modeled as Gaussian noise. This leads naturally to the concept of likelihood, enabling us to define the maximum likelihood estimator and develop corresponding distributional theories.
The distribution theory of least squares allows us to address a variety of statistical questions and introduces several practical applications. We also prove that the least squares estimator is the best unbiased linear estimator in terms of mean squared error---moreover, it approaches the theoretical performance limit.
Finally, we extend our discussion to include Bayesian approaches to linear models and touch upon related advanced topics.

The mathematical prerequisites for this book are modest: a first course in linear algebra and basic statistics. Beyond this, the development is self-contained, with detailed and rigorous proofs provided throughout.


The main objective of this book is to offer a self-contained introduction to the core concepts, mathematical tools, and rigorous analysis behind linear models, enabling a smooth transition to the discussion of their methods and applications in subsequent sections.
However, we acknowledge the limitations of this work---we cannot cover all useful or interesting results related to linear models. Due to space constraints, we do not include separate treatments of certain topics such as LASSO and ridge regression. For a more in-depth exploration of these subjects, we refer readers to specialized literature on linear models and regularization techniques.

\newpage

\chapter*{\centering \begin{normalsize}Keywords\end{normalsize}}
Fundamental theory of linear algebra, Orthogonal projection matrix, Out-of-sample error, Gauss-Markov,  Cram\'er-Rao lower bound (CRLB), Minimum variance unbiased estimator, Distribution theory of linear models, Asymptotic theory, Connection between Bayesian approach and Gaussian process, Variable selection, Large-scale optimization, Generalized linear models.

\newpage
\begingroup
\hypersetup{
linkcolor=structurecolor,
linktoc=page,  
}
\dominitoc
\pdfbookmark{\contentsname}{toc} 
\tableofcontents 

\endgroup

\chapter*{Notation}\label{notation}
\index{Notation}


This section provides a concise reference describing notation used throughout this
book.
If you are unfamiliar with any of the corresponding mathematical concepts,
the book describes most of these ideas in Chapter~\ref{chapter_lsintroduction} (p.~\pageref{chapter_lsintroduction}).

\vspace{0.4in}
\begin{minipage}{\textwidth}
\centerline{\bf Numbers and Arrays}
\bgroup
\def\arraystretch{1.5}
\begin{tabular}{cp{4.25in}}
$\displaystyle a$   & A scalar (integer or real)\\
$\displaystyle \ba$ & A vector\\
$\displaystyle \bA$ & A matrix\\
$\displaystyle \bI_n$ & Identity matrix with $n$ rows and $n$ columns\\
$\displaystyle \bI$   & Identity matrix with dimensionality implied by context\\
$\displaystyle \ve_i$ & Standard basis vector $[0,\dots,0,1,0,\dots,0]$ with a 1 at position $i$\\
$\displaystyle \text{diag}(\va)$ & A square, diagonal matrix with diagonal entries given by $\va$\\
$\displaystyle \ra$   & A scalar random variable\\
$\displaystyle \rva$  & A vector-valued random variable\\
$\displaystyle \rmA$  & A matrix-valued random variable\\
\end{tabular}
\egroup
\index{Scalar}
\index{Vector}
\index{Matrix}
\index{Tensor}
\end{minipage}

\index{Sets}
\vspace{0.2in}
\begin{minipage}{\textwidth}
\centerline{\bf Sets}
\bgroup
\def\arraystretch{1.5}
\begin{tabular}{cp{4.25in}}
$\displaystyle \sA$ & A set\\
$\displaystyle \varnothing$ & The null set \\
$\displaystyle \real, \complex, \sF\equiv \{\real \text{ or }\complex\}$ & The set of real, complex, either real or complex numbers\\
$\displaystyle \naturalset$ & The set of natural numbers \\
$\displaystyle \{0, 1\}$ & The set containing 0 and 1 \\
$\displaystyle \{0, 1, \dots, n \}$ & The set of all integers between $0$ and $n$\\
$\displaystyle [a, b]$ & The real interval including $a$ and $b$\\
$\displaystyle (a, b]$ & The real interval excluding $a$ but including $b$\\
$\displaystyle \sA \backslash \sB$ & Set subtraction, i.e., the set containing the elements of $\sA$ that are not in $\sB$\\
\end{tabular}
\egroup
\index{Scalar}
\index{Vector}
\index{Matrix}
\index{Tensor}
\index{Graph}
\index{Set}
\end{minipage}

\index{Matrix indexing}
\vspace{0.2in}
\begin{minipage}{\textwidth}
\centerline{\bf Indexing}
\bgroup
\def\arraystretch{1.5}
\begin{tabular}{cp{4.25in}}
$\displaystyle \eva_i$ & Element $i$ of vector $\va$, with indexing starting at 1 \\
$\displaystyle \ba_{-i}$ & All elements of vector $\va$ except for element $i$ \\
$\displaystyle  a_{ij}$ & Element $i, j$ of matrix $\mA$ \\
$\displaystyle \mA_{i, :}=\mA[i,:],\, \ba^{(i)}$ & Row $i$ of matrix $\mA$ \\
$\displaystyle \mA_{:, i}=\mA[:, i],\, \ba_i$ & Column $i$ of matrix $\mA$ \\
\end{tabular}
\egroup
\end{minipage}

\vspace{0.2in}
\begin{minipage}{\textwidth}
\centerline{\bf Linear Algebra Operations}
\bgroup
\def\arraystretch{1.5}
\begin{tabular}{cp{4.25in}}
$\displaystyle \bA^\top$ & Transpose of matrix $\mA$ \\
$\displaystyle \bA^+$ & Moore-Penrose pseudo-inverse of $\mA$\\
$\displaystyle \bA \hadaprod \bB $ & Element-wise (Hadamard) product of $\mA$ and $\mB$ \\
$\displaystyle \mathrm{det}(\bA)$ & Determinant of $\bA$ \\
$\displaystyle \mathrm{rref}(\bA)$ & Reduced row echelon form of $\bA$ \\
$\displaystyle \cspace(\bA)$ & Column space of $\bA$ \\
$\displaystyle \nspace(\bA)$ & Null space of $\bA$ \\
$\displaystyle \mathcalV$ & A general subspace \\
$\displaystyle \rank(\bA)$ & Rank of $\bA$ \\
$\displaystyle \trace(\bA)$ & Trace of $\bA$ \\
\end{tabular}
\egroup
\index{Transpose}
\index{Element-wise product, Hadamard product}
\index{Hadamard product}
\index{Determinant}
\end{minipage}

\vspace{0.4in}
\begin{minipage}{\textwidth}
\centerline{\bf Calculus}
\bgroup
\def\arraystretch{1.5}
\begin{tabular}{cp{4.25in}}
$\displaystyle\frac{d y} {d x}$ & Derivative of $y$ with respect to $x$\\ [2ex]
$\displaystyle \frac{\partial y} {\partial x} $ & Partial derivative of $y$ with respect to $x$ \\
$\displaystyle \nabla_{\bx} y $ & Gradient of $y$ with respect to $\bx$ \\
$\displaystyle \nabla_{\bX} y $ & Matrix derivatives of $y$ with respect to $\bX$ \\
$\displaystyle \frac{\partial f}{\partial \vx} $ & Jacobian matrix $\mJ \in \R^{m\times n}$ of $f: \R^n \rightarrow \R^m$\\
$\displaystyle \nabla_\vx^2 f(\vx)\text{ or }\mH( f)(\vx)$ & The Hessian matrix of $f$ at input point $\vx$\\
$\displaystyle \int f(\vx) d\vx $ & Definite integral over the entire domain of $\vx$ \\
$\displaystyle \int_\sS f(\vx) d\vx$ & Definite integral with respect to $\vx$ over the set $\sS$ \\
\end{tabular}
\egroup
\index{Derivative}
\index{Integral}
\index{Jacobian matrix}
\index{Hessian matrix}
\end{minipage}

\vspace{0.4in}
\begin{minipage}{\textwidth}
\centerline{\bf Probability and Information Theory}
\bgroup
\def\arraystretch{1.5}
\begin{tabular}{cp{4.25in}}
$\displaystyle \ra \bot \rb$ & The random variables $\ra$ and $\rb$ are independent\\
$\displaystyle \ra \bot \rb \mid \rc $ & They are conditionally independent given $\rc$\\
$\displaystyle \Pr(\bx)$ & A probability distribution over a discrete variable\\
$\displaystyle p(\bx), p_{\rvx}(\bx), f(\bx), f_{\rvx}(\bx)$ & A probability distribution over a continuous variable, or over
a variable whose type has not been specified\\
$\displaystyle \ra \sim P$ & Random variable $\ra$ has distribution $P$\\
$\displaystyle  \Exp_{\rx\sim P} [ f(x) ]\text{ or } \Exp [f(x)]$ & Expectation of $f(x)$ with respect to $P(\rx)$ \\
$\displaystyle \Var[f(x)] $ &  Variance of $f(x)$ under $P(\rx)$ \\
$\displaystyle \Cov[f(x),g(x)] $ & Covariance of $f(x)$ and $g(x)$ under $P(\rx)$\\
$\displaystyle \Corr[f(x),g(x)] $ & Correlation of $f(x)$ and $g(x)$ under $P(\rx)$\\
$\displaystyle H(\rx) $ & Shannon entropy of the random variable $\rx$\\
$\displaystyle \KL [P \parallel Q] $ & Kullback-Leibler divergence of P and Q \\
$\displaystyle \mathcal{N} ( \vx \mid \vmu , \mSigma)$ & Gaussian distribution %
over $\vx$ with mean $\vmu$ and covariance $\mSigma$ \\
$\displaystyle \bernoullidist(p) $ & Bernoulli distribution with mean $p$ \\
$\displaystyle \laplacedist(\mu,b) $ & Laplace distribution with location $\mu$ and scale $b$ \\
$\displaystyle \exponential(\lambda) $ & Exponential distribution with scale $\lambda$ \\
$\displaystyle \poissondist(\lambda) $ & Poisson distribution with rate $\lambda$ \\
$\displaystyle \chisquared_{(p)} $ & Chi-squared distribution with  $p$ degrees of freedom (df)\\
$\displaystyle \wishartdist(\bM,\nu) $ & Wishart distribution with scale  $\bM$ and  df $\nu$ \\
$\displaystyle \inversewishart(\bS,\nu) $ & Inverse-Wishart distribution with scale  $\bS$ and  df $\nu$ \\
\end{tabular}
\egroup
\index{Independence}
\index{Conditional independence}
\index{Variance}
\index{Covariance}
\index{Kullback-Leibler divergence}
\index{Shannon entropy}
\end{minipage}

\vspace{0.4in}
\begin{minipage}{\textwidth}
\centerline{\bf Functions}
\bgroup
\def\arraystretch{1.5}
\begin{tabular}{cp{4.25in}}
$\displaystyle f: \sA \rightarrow \sB$ & The function $f$ with domain $\sA$ and range $\sB$\\
$\displaystyle f \circ g $ & Composition of the functions $f$ and $g$ \\
  $\displaystyle f(\vx ; \vtheta) $ & A function of $\vx$ parametrized by $\vtheta$.
  (Sometimes we write $f(\vx)$ and omit the argument $\vtheta$ to lighten notation) \\
$\displaystyle \ln(x),\log(x)$ & Natural logarithm of $x$ \\
$\displaystyle \sigma(x),\, \text{Sigmoid}(x)$ & Logistic sigmoid, i.e., $\displaystyle \frac{1} {1 + \exp\{-x\}}$ \\
$\displaystyle \text{logit}(\pi)$ & Logit function, i.e. $\text{logit}(\pi) = \ln(\pi/(1-\pi))$, where $\pi\in(0,1)$ \\
$\displaystyle \zeta(x)$ & Softplus, $\log(1 + \exp\{x\})$ \\
$\displaystyle \norm{\bx}_p, \norm{\bx}_s $ & $\ell_p$ norm of $\vx$ \\
$\displaystyle \norm{\bx}=\normtwo{\bx} $ & $\ell_2$ norm of $\vx$ \\
$\displaystyle \norm{\bx}=\normone{\bx} $ & $\ell_1$ norm of $\vx$ \\
$\displaystyle \norm{\bx}=\norminf{\bx} $ & $\ell_\infty$ norm of $\vx$ \\
$\displaystyle [x]_+$ & Positive part of $x$, i.e., $\max(0,x)$\\
$\displaystyle u(x)$ & Step function with value 1 when $x\geq0$ and value 0 otherwise\\
$\displaystyle \indicator\{\mathrm{condition}\}$ & is 1 if the condition is true, 0 otherwise\\
$\displaystyle \Phi(x), \Phi^{-1}(\pi)$ & Standard Gaussian cdf, and the probit function, where $\pi\in(0,1)$\\
$\displaystyle \text{Negative binomial}(\alpha, x)$ & $\eta = \ln(x/(x+1/\alpha))$\\
$\displaystyle h(\eta)$ & Response function in GLMs\\
$\displaystyle g(\mu)$ & Link function in GLMs\\
\end{tabular}
\egroup
\index{Sigmoid function}
\index{Softplus}
\index{Norm}
\end{minipage}
Sometimes we use a function $f$ whose argument is a scalar but apply
it to a vector, matrix: $f(\vx)$, $f(\mX)$.
This denotes the application of $f$ to the
array element-wise. For example, if $\bC = \sigma(\bX)$, then $c_{ij} = \sigma(x_{ij})$
for all valid values of $i$ and  $j$.

\vspace{0.4in}
\begin{minipage}{\textwidth}
\centerline{\bf Other General Notastions}
\bgroup
\def\arraystretch{1.5}
\begin{tabular}{cp{4.25in}}
$\displaystyle \triangleq$ & Equals by definition\\
$\displaystyle :=, \leftarrow $ & Equals by assignment \\
$\displaystyle \equiv $    & Equals by equivalence \\
$\displaystyle \pi $       & A probability value or 3.141592....\\
$\displaystyle e, \exp $   & 2.71828...
\end{tabular}
\egroup
\end{minipage}

\vspace{0.4in}
\begin{minipage}{\textwidth}
\centerline{\bf Abbreviations}
\bgroup
\def\arraystretch{1.5}
\begin{tabular}{cp{4.25in}}
PD & Positive definite  \\
PSD & Positive semidefinite \\
MCMC & Markov chain Monte Carlo \\
i.i.d. & Independently and identically distributed \\
p.d.f., PDF & Probability density function \\
p.m.f., PMF & Probability mass function \\
LS, OLS & Ordinary least squares\\
IW & Inverse-Wishart distribution \\
NIW & Normal-inverse-Wishart distribution \\
ALS & Alternating least squares \\
GD & Gradient descent\\
SGD & Stochastic gradient descent \\
MSE & Mean squared error\\
MLE & Maximum likelihood estimator\\
CLT & Central limit theorem \\
CMT & Continuous mapping theorem \\
QR & QR decomposition \\
SVD & Singular value decomposition\\
ANOVA & Analysis of variance\\
GLM & Generalized linear model\\
GLS & Generalized least squares\\
REF & Row echelon form\\
RREF & Reduced row echelon form\\

\end{tabular}
\egroup
\end{minipage}


\clearpage


\mainmatter

\newpage 
\chapter{Introduction}\label{chapter_lsintroduction}
\begingroup
\hypersetup{
linkcolor=structurecolor,
linktoc=page,  
}
\minitoc \newpage
\endgroup
\section{Introduction and Background}
\lettrine{\color{caligraphcolor}T}
This book is meant to provide an introduction to linear models and  their underlying theories. Our goal is to give a rigorous introduction to the readers with prior exposure to ordinary least squares. 
While machine learning often deals with nonlinear relationships, including those explored in deep learning with intricate layers demanding substantial computation, many algorithms are rooted in simple linear models.

The exposition approaches linear models from various perspectives, elucidating their properties and associated theories. 
In regression problems, the primary tool is the least squares approximation, minimizing the sum of squared errors. 
This is a natural choice when we're interested in finding the regression function, which minimizes the corresponding expected squared error.

This book is primarily a summary of purpose, emphasizing the  significance of important theories behind linear models, e.g., distribution theory, minimum variance estimator. 
We begin by presenting ordinary least squares from various distinct  points of view, upon which we disturb the model with random noise and Gaussian noise. 
The introduction of Gaussian noise establishes a likelihood, leading to the derivation of a maximum likelihood estimator and the development of distribution theories related to this Gaussian disturbance, which will help us answer various questions and introduce related applications. 
The subsequent proof establishes that least squares is the best unbiased linear model in terms of mean squared error, and notably, it approaches the theoretical limit. 
The exploration extends to linear models within a Bayesian framework and a generalized linear model framework.
The mathematical prerequisites are a first course in linear algebra and  statistics. Beyond these basic requirements, the content is self-contained, featuring rigorous proofs throughout.

Linear models play a central role in machine learning, particularly as the concatenation of simple linear models has led to the development of intricate nonlinear models like neural networks.  
The sole aim of this book is to give a self-contained introduction to concepts and mathematical tools in theory behind linear models and rigorous analysis in order to seamlessly introduce linear model methods and their applications in subsequent sections. 
It is acknowledged, however, that the book cannot comprehensively cover all valuable and interesting results related to linear models. Due to constraints, topics like the separate analysis of LASSO and ridge regression are not exhaustively discussed here. 
Interested readers are directed to relevant literature in the field of linear models for more in-depth exploration. 
Some excellent examples include \citet{strang1993introduction, panaretos2016statistics, hoff2009first, strang2021every, beck2014introduction, jackson2024glm}.

In the remainder of this chapter, we briefly introduce and review some basic notation and concepts from mathematics. Additional definitions will be introduced as needed throughout the text for clarity.

\section{Linear Algebra}

In all cases, scalars will be denoted in a non-bold font possibly with subscripts (e.g., $a$, $\alpha$, $\alpha_i$). We will use \textbf{boldface} lowercase letters possibly with subscripts to denote vectors (e.g., $\bmu$, $\bx$, $\bx_n$, $\bz$) and
\textbf{boldface} uppercase letters possibly with subscripts to denote matrices (e.g., $\bX$, $\bL_j$). The $i$-th element of a vector $\bz$ will be denoted by $z_i$ in non-bold font.
In the meantime, the \textit{normal fonts} of scalars denote  \textbf{random variables} (e.g., $\textnormal{a}$ and $\textnormal{b}_1$ are random variables, while italics $a$ and $b_1$ are scalars); 
the normal fonts of \textbf{boldface} lowercase letters, possibly with subscripts, denote \textbf{random vectors} (e.g., $\rva$ and $\rvb_1$ are random vectors, while italics $\ba$ and $\bb_1$ are vectors); 
and the normal fonts of \textbf{boldface} uppercase letters, possibly with subscripts, denote \textbf{random matrices} (e.g., $\rmA$ and $\rmB_1$ are random matrices, while italics $\bA$ and $\bB_1$ are matrices).

Subarrays are formed by fixing a subset of indices of a matrix.
The element located in the $i$-th row and $j$-th column of a matrix $\bX$ (i.e., the $(i,j)$ entry) is denoted by $x_{ij}$; in this case, $\bX\in\real^{n\times p}$ can be denoted as $\bX=\{x_{ij}\}_{i,j=1}^{n,p}=[x_{ij}]$.
Furthermore, it will be helpful to utilize the \textbf{Matlab-style notation}, the $i$-th row to the $j$-th row and the $k$-th column to the $m$-th column submatrix of the matrix $\bX$ will be denoted by $\bX_{i:j,k:m} \equiv \bX[i:j,k:m]$. A colon is used to indicate all elements of a dimension, e.g., $\bX_{:,k:m} \equiv\bX[:,k:m]$ denotes the $k$-th column to the $m$-th column of the matrix $\bX$, and $\bX_{:,k}\equiv\bX[:,k]$ denotes the $k$-th column of $\bX$. 
Alternatively, the $k$-th column of $\bX$ may be denoted more compactly by $\bx_k$; and the $k$-th row of $\bX$ can be denoted as $\bx^{(k)}$.

When the index is not continuous, given ordered subindex sets $\sI$ and $\sJ$, $\bX[\sI, \sJ]$ denotes the submatrix of $\bX$ obtained by extracting the rows and columns of $\bX$ indexed by $\sI$ and $\sJ$, respectively; and $\bX[:, \sJ]$ denotes the submatrix of $\bX$ obtained by extracting the columns of $\bX$ indexed by $\sJ$, where again the colon operator implies all indices.

\index{Matlab notation}
\begin{definition}[Matlab notation]\label{definition:matlabnotation}
Suppose $\bX\in \real^{n\times p}$, and $\sI=\{i_1, i_2, \ldots, i_k\}$ and $\sJ=\{j_1, j_2, \ldots, j_l\}$ are two index vectors. 
Then $\bX[\sI,\sJ]$ denotes the $k\times l$ submatrix
$$
\bX[\sI,\sJ]=
\begin{bmatrix}
x_{i_1,j_1} & x_{i_1,j_2} &\ldots & x_{i_1,j_l}\\
x_{i_2,j_1} & x_{i_2,j_2} &\ldots & x_{i_2,j_l}\\
\vdots & \vdots&\ddots & \vdots\\
x_{i_k,j_1} & x_{i_k,j_2} &\ldots & x_{i_k,j_l}\\
\end{bmatrix}.
$$
Whilst, $\bX[\sI,:]$ denotes the $k\times p$ submatrix, and $\bX[:,\sJ]$ denotes the $n\times l$ submatrix analogously.
We should also notice that the range of the index satisfies:
$$
\left\{
\begin{aligned}
0&\leq \min(\sI) \leq \max(\sI)\leq n;\\
0&\leq \min(\sJ) \leq \max(\sJ)\leq p.
\end{aligned}
\right.
$$
\end{definition}

And in all cases, vectors are formulated in a column rather than in a row. A row vector will be denoted by a transpose of a column vector, such as $\bx^\top$. A specific column vector with values is separated  by the semicolons  $``;"$, e.g., 
$$\bx=[1;2;3] \qquad \text{(column vector)}
$$ 
is a column vector in $\real^3$. Similarly, a specific row vector with values is separated by the comma $``,"$, e.g., 
$$\by=[1,2,3]\qquad \text{(row vector)}
$$ 
is a row vector with 3 values. 
Alternatively, a column vector can also be written as the transpose of a row vector. For instance, $\by=[1,2,3]^\top$ is a column vector.

The transpose of a matrix $\bX$ will be denoted by $\bX^\top$,
and its inverse will be denoted by $\bX^{-1}$ . We will denote the $p \times p$ identity matrix by $\bI_p$. A vector or matrix of all zeros will be denoted by a \textbf{boldface} zero $\bzero$, whose size should be clear from context; or we denote $\bzero_p$ to be the vector of all zeros with $p$ entries.
Similarly, a vector or matrix of all ones will be denoted by a \textbf{boldface} one $\bone$, whose size is clear from  context; or we denote $\bone_p$ to be the vector of all ones with $p$ entries.
Subscripts are often omitted when the dimensions are evident from the context.

\index{Eigenvalue}
\index{Eigenvector}
\begin{definition}[Eigenvalue, Eigenvector]
Given any vector space $\sF$ and any linear map $\bX: \sF \rightarrow \sF$ (or simply a real matrix $\bX\in\real^{n\times n}$), a scalar $\lambda \in \sK$ is called a \textit{(right) eigenvalue, or proper value, or characteristic value} of $\bX$, if there exists  a nonzero vector $\bu \in \sF$ such that
\begin{equation*}
\bX \bu = \lambda \bu.
\end{equation*}
And $\bu$ is called a \textit{(right) eigenvector} of $\bX$ associated with $\lambda$.

On the other hand, $\kappa$ is referred to as a \textit{left eigenvalue} if there exists a nonzero vector $\bv\in \sF$ such that 
$$
\bv^\top\bX = \kappa \bv^\top.
$$
And $\bv$ is called a \textit{left eigenvector} of $\bX$ associated with $\kappa$.

When it is clear from the context, we will simply use the term ``eigenvalue/eigenvector" instead of ``right eigenvalue/eigenvector."

\end{definition}
For simplicity, we focus only on real-valued matrices unless otherwise specified. Unless explicitly stated otherwise, all eigenvalues discussed are assumed to be real as well.  

In simple terms, an eigenvector $\bu$ of a matrix $\bX$ represents a direction that remains unchanged when transformed into the coordinate system defined by the columns of $\bX$.
In fact, real-valued matrices can have complex eigenvalues. However, all eigenvalues of symmetric matrices are guaranteed to be real (see Theorem~\ref{theorem:spectral_theorem}).

\index{Spectrum}
\index{Spectral radius}
\begin{definition}[Spectrum and Spectral Radius]\label{definition:spectrum}
The set of all eigenvalues of a matrix $\bX$ is called the \textit{spectrum} of  $\bX$, and is denoted by $\Lambda(\bX)$. 
The largest magnitude  among the eigenvalues is known as the \textit{spectral radius} of $\bX$, denoted by $\rho(\bX)$:
$
\rho(\bX) = \mathop{\max}_{\lambda\in \Lambda(\bX)}  \abs{\lambda}.
$
\end{definition}

Moreover, the tuple $(\lambda, \bu)$  is referred to as an \textit{eigenpair}. Intuitively, the above definitions mean that multiplying matrix $\bX$ by the vector $\bu$ results in a new vector that is in the same direction as $\bu$, but only scaled by a factor $\lambda$. For any eigenvector $\bu$, we can scale it by a scalar $s$ such that $s\bu$ is still an eigenvector of $\bX$. That's why we say that the eigenvector is an eigenvector of $\bX$ associated with the eigenvalue $\lambda$. To avoid ambiguity, we usually assume that the eigenvector is normalized to have length one and the first entry is positive (or negative) since both $\bu$ and $-\bu$ are eigenvectors. 

In linear algebra, every vector space has a basis, and every vector in that space can be expressed as a linear combination of the basis vectors. Based on this idea, we define the span and dimension of a subspace using the concept of a basis.

\index{Subspace}
\begin{definition}[Subspace]
A nonempty subset $\mathcalV$ of $\real^n$ is called a subspace if $x\ba+y\ba\in \mathcalV$ for every $\ba,\bb\in \mathcalV$ and every $x,y\in \real$.
\end{definition}

\index{Span}
\begin{definition}[Span]
If every vector $\bv$ in a subspace $\mathcalV$ can be expressed as a linear combination of $\{\bx_1, \bx_2, \ldots,$ $\bx_n\}$, then we say that these vectors \textit{span} the subspace $\mathcalV$.
\end{definition}

\index{Linearly independent}
The concept of linear independence of a set of vectors is central to linear algebra. Two equivalent definitions are given below.
\begin{definition}[Linearly independent]
A set of vectors $\{\bx_1, \bx_2, \ldots, \bx_n\}$ is said to be \textit{linearly independent} if there is no combination can get $a_1\bx_1+a_2\bx_2+\ldots+a_n\bx_n=0$ except all $a_i$'s are zero. An equivalent definition is that $\bx_1\neq \bzero$, and for every $k>1$, the vector $\bx_k$ does not belong to the span of $\{\bx_1, \bx_2, \ldots, \bx_{k-1}\}$.
\end{definition}

\index{Basis}
\index{Dimension}
\begin{definition}[Basis and dimension]
A set of vectors $\{\bx_1, \bx_2, \ldots, \bx_n\}$ is called a \textit{basis} of a subspace $\mathcalV$ if they are linearly independent, and they span $\mathcalV$. Every basis of a given subspace contains the same number of vectors, and the number of vectors in any basis is called the \textit{dimension} of the subspace $\mathcalV$. By convention, the trivial subspace $\{\bzero\}$ is said to have dimension zero. 
Furthermore, every subspace with a nonzero dimension has an orthogonal basis; in other words, the basis of a subspace can be chosen orthogonal (Definition~\ref{definition:orthogn_mat}).
\end{definition}

\index{Column space}
\begin{definition}[Column space (range)]
Let $\bX$ be an $n \times p$ real matrix. The \textit{column space (or range)} of $\bX$ is defined as  the set of all vectors that can be expressed as a linear combination of its columns:
\begin{equation*}
\cspace (\bX) = \{ \bv\in \mathbb{R}^n: \exists \bu \in \mathbb{R}^p, \, \bv = \bX \bu \}.
\end{equation*}
Similarly, the row space of $\bX$ is the set of all vectors spanned by its rows. Equivalently, it is the column space of $\bX^\top$:
\begin{equation*}
\cspace (\bX^\top) = \{ \bu\in \mathbb{R}^p: \exists \bv \in \mathbb{R}^n, \, \bu = \bX^\top \bv \}.
\end{equation*}
\end{definition}

\index{Null space}
\index{Left null space}
\index{Right null space}
\begin{definition}[Null space (nullspace, kernel)]\label{definition:nullspace}
Let $\bX$ be an $n \times p$ real matrix. The \textit{null space (or kernel, or nullspace)} of $\bX$ is defined as  the set:
\begin{equation*}
\nspace (\bX) = \{\bv \in \mathbb{R}^p:  \, \bX \bv = \bzero \}.
\end{equation*}
In some cases, the null space of $\bX$ is also referred to as the \textit{right null space} of $\bX$.
And the null space of $\bX^\top$ is defined as 	
\begin{equation*}
\nspace (\bX^\top) = \{\bu \in \mathbb{R}^n:  \, \bX^\top \bu = \bzero \}.
\end{equation*}
Similarly, the null space of $\bX^\top$ is also referred to as the \textit{left null space} of $\bX$.
\end{definition}

Both the column space of $\bX$ and the null space of $\bX^\top$ are subspaces of $\real^n$. In fact, every vector in $\nspace(\bX^\top)$ is orthogonal  to $\cspace(\bX)$ and vice versa. Similarly, every vector in $\nspace(\bX)$ is also orthogonal  to $\cspace(\bX^\top)$ and vice versa.

\index{Rank}
\index{Dimension}
\begin{definition}[Rank]\label{definition:rank}
The $rank$ of a matrix $\bX\in \real^{n\times p}$ is defined as the dimension of its column space. That is, the rank of $\bX$ is equal to the maximum number of linearly independent columns of $\bX$, and is also the maximum number of linearly independent rows of $\bX$. The matrix $\bX$ and its transpose $\bX^\top$ have the same rank. We say that $\bX$ has full rank if its rank is equal to $\min\{n,p\}$.  Specifically, given a vector $\bu \in \real^n$ and a vector $\bv \in \real^p$, then the $n\times p$ matrix $\bu\bv^\top$ obtained by the outer product of vectors is of rank 1. In short, the rank of a matrix is equal to:
\begin{itemize}
\item the number of linearly independent columns;
\item the number of linearly independent rows;
\item and remarkably, these are always the same (see Lemma~\ref{lemma:equal-dimension-rank}).
\end{itemize}
\end{definition}

\index{Orthogonal complement}
\begin{definition}[Orthogonal complement in general]
The \textit{orthogonal complement} $\mathcalV^\perp\subseteq\real^n$ of a subspace $\mathcalV\subseteq\real^n$ consists of all vectors that are perpendicular to every vector in $\mathcalV$. That is,
$$
\mathcalV^\perp = \{\bv\in\real^n: \bv^\top\bu=0, \ \forall\, \bu\in \mathcalV  \}.
$$
These two subspaces are disjoint and together span the entire space $\real^n$. 
The dimensions of $\mathcalV$ and $\mathcalV^\perp$ add up to the dimension of the entire space: $\dim(\mathcalV)+\dim(\mathcalV^\perp)=n$. Furthermore, $(\mathcalV^\perp)^\perp=\mathcalV$.
\end{definition}

\begin{definition}[Orthogonal complement of column space]\label{definition:ortho_comp_col}
Let $\bX$ be an $n \times p$ real matrix. The orthogonal complement of the column space $\cspace(\bX)$, denoted by $\cspace^{\bot}(\bX)$, is the subspace defined as:
\begin{equation*}
\begin{aligned}
\cspace^{\bot}(\bX) &= \{\bv\in \mathbb{R}^n: \, \bv^\top \bX \bu=\bzero, \, \forall\, \bu \in \mathbb{R}^p \} \\
&=\{\bv\in \mathbb{R}^n: \, \bv^\top \bw = \bzero, \, \forall\, \bw \in \cspace(\bX) \}.
\end{aligned}
\end{equation*}
\end{definition}
Then we have the \textit{four fundamental spaces} for any matrix $\bX\in \real^{n\times p}$ with rank $r$:
\begin{itemize}
\item  $\cspace(\bX)$: Column space of $\bX$, i.e., linear combinations of columns with dimension $r$.
\item  $\nspace(\bX)$: (Right) null space of $\bX$, i.e., all $\bu$ satisfying $\bX\bu=\bzero$ with dimension $p-r$.
\item  $\cspace(\bX^\top)$: Row space of $\bX$, i.e., linear combinations of rows with dimension $r$.
\item  $\nspace(\bX^\top)$: Left null space of $\bX$, i.e., all $\bv$ satisfying $\bX^\top \bv=\bzero$ with dimension $n-r$. 
\end{itemize}
Furthermore, $\nspace(\bX)$ is the orthogonal complement of $\cspace(\bX^\top)$, and $\cspace(\bX)$ is the orthogonal complement of $\nspace(\bX^\top)$. The proof is further discussed in Theorem~\ref{theorem:fundamental-linear-algebra}.

\index{Rank}
\index{Dimension}
We establish the equivalence stated in Definition~\ref{definition:rank}.
\begin{lemma}[Dimension of column space and row space]\label{lemma:equal-dimension-rank}
The dimension of the column space of a matrix $\bX\in \real^{n\times p}$ is equal to the dimension of its
row space, i.e., the row rank and the column rank of a matrix $\bX$ are equal.
\end{lemma}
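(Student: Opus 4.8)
The plan is to establish the two inequalities $(\text{column rank}) \le (\text{row rank})$ and $(\text{row rank}) \le (\text{column rank})$ separately and then conclude by antisymmetry. The engine producing a single inequality is a rank factorization of the CR type already exploited in the proof of Lemma~\ref{lemma:rank-of-symmetric-idempotent2}. Throughout I write $\dim(\cspace(\bX))$ for the column rank and $\dim(\cspace(\bX^\top))$ for the row rank, and I aim to show that for \emph{any} matrix the row rank is at most the column rank; applying this single implication to both $\bX$ and $\bX^\top$ will then force equality.

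First I would set $r = \dim(\cspace(\bX))$ and fix a basis $\{\bc_1, \bc_2, \ldots, \bc_r\}$ of the column space $\cspace(\bX)$, assembled as the columns of a matrix $\bC = [\bc_1, \ldots, \bc_r] \in \real^{n\times r}$. Since each column of $\bX$ lies in $\cspace(\bX)$, it is a linear combination of $\bc_1, \ldots, \bc_r$; collecting the corresponding coefficient vectors as the columns of a matrix $\bR \in \real^{r\times p}$ yields the factorization $\bX = \bC\bR$. The crucial move is then to read this same identity row-wise: denoting by $\bX_{i,:}$ the $i$-th row, the product gives $\bX_{i,:} = \bC_{i,:}\,\bR$, so every row of $\bX$ is a linear combination of the $r$ rows of $\bR$. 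Hence $\cspace(\bX^\top)$ is spanned by at most $r$ vectors, giving $\dim(\cspace(\bX^\top)) \le r = \dim(\cspace(\bX))$, i.e. row rank $\le$ column rank.

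Finally I would apply this very implication to the matrix $\bX^\top$ in place of $\bX$: it yields $\dim(\cspace((\bX^\top)^\top)) \le \dim(\cspace(\bX^\top))$, which is exactly $\dim(\cspace(\bX)) \le \dim(\cspace(\bX^\top))$, i.e. column rank $\le$ row rank. Combining the two inequalities forces $\dim(\cspace(\bX)) = \dim(\cspace(\bX^\top))$, as claimed. I expect no deep obstacle here; the only points demanding care are the rigorous justification that the coefficient matrix $\bR$ exists with exactly $r$ rows, and the clean bookkeeping of which subspace counts as ``row'' versus ``column'' for $\bX$ against $\bX^\top$, since the whole argument hinges on reusing one asymmetric inequality in a symmetric way.
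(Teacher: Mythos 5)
Your proof is correct, but it takes a genuinely different route from the paper's primary argument. The paper fixes a basis $\br_1,\ldots,\br_r$ of the \emph{row} space and shows that $\bX\br_1,\ldots,\bX\br_r$ are linearly independent vectors in the column space; the independence step relies on the Euclidean orthogonality $\nspace(\bX)\perp\cspace(\bX^\top)$ (a vector lying in both subspaces satisfies $\bv^\top\bv=0$, hence is zero). That yields row rank $\le$ column rank, and the inequality is then applied to $\bX^\top$, exactly as you do. Your engine for the single inequality is instead a rank factorization $\bX=\bC\bR$ with $\bC$ built from a column-space basis, read row-wise to bound the row rank by $r$. This is more elementary: it uses no inner product and therefore works over an arbitrary field, whereas the paper's independence argument genuinely needs $\bv^\top\bv=0\Rightarrow\bv=\bzero$. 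Your argument is essentially the paper's ``second way'' given in the CR-decomposition appendix, though cleaner: there the paper extracts the row rank of $\bR$ from its reduced-row-echelon structure together with a sandwiching argument, while you avoid imposing any structure on $\bR$ by simply running the one-sided inequality twice, once for $\bX$ and once for $\bX^\top$. Both proofs are sound; the one point you flagged --- existence of $\bR$ with exactly $r$ rows --- is immediate, since each column of $\bX$ has a (unique) coordinate vector with respect to the basis $\{\bc_1,\ldots,\bc_r\}$ of $\cspace(\bX)$.
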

\begin{proof}[of Lemma~\ref{lemma:equal-dimension-rank}]
We first notice that the null space of $\bX$ is orthogonal complementary to the row space of $\bX$: $\nspace(\bX) \bot \cspace(\bX^\top)$ (where the row space of $\bX$ is equivalent to the column space of $\bX^\top$). 
That is, vectors in the null space of $\bX$ are orthogonal to vectors in the row space of $\bX$. To see this, suppose $\bX=[\bx_1^\top; \bx_2^\top; \ldots; \bx_n^\top]$ is the row partition of $\bX$. For any vector $\bbeta\in \nspace(\bX)$, we have $\bX\bbeta = \bzero$, that is, $[\bx_1^\top\bbeta; \bx_2^\top\bbeta; \ldots; \bx_n^\top\bbeta]=\bzero$. 
And since the row space of $\bX$ is spanned by $\{\bx_1^\top, \bx_2^\top, \ldots, \bx_n^\top\}$, thus, $\bbeta$ is perpendicular to any vectors from $\cspace(\bX^\top)$. This indicates $\nspace(\bX) \bot \cspace(\bX^\top)$.

Now, assuming the dimension of the row space of $\bX$ is $r$,  \textcolor{mylightbluetext}{let $\br_1, \br_2, \ldots, \br_r$ be a set of vectors in $\real^p$ and form a basis for the row space}. 
Consequently, the $r$ vectors $\bX\br_1, \bX\br_2, \ldots, \bX\br_r$ are in the column space of $\bX$; furthermore, they are linearly independent. To see this, suppose we have a linear combination of the $r$ vectors: $\beta_1\bX\br_1 + \beta_2\bX\br_2+ \ldots+ \beta_r\bX\br_r=0$, that is, $\bX(\beta_1\br_1 + \beta_2\br_2+ \ldots+ \beta_r\br_r)=0$, and the vector $\bv=\beta_1\br_1 + \beta_2\br_2+ \ldots+ \beta_r\br_r$ is in null space of $\bX$. But since $\{\br_1, \br_2, \ldots, \br_r\}$ is a basis for the row space of $\bX$, $\bv$ is thus also in the row space of $\bX$. We have shown that vectors from null space of $\bX$ is perpendicular to vectors from row space of $\bX$, thus $\bv^\top\bv=0$ and $\beta_1=\beta_2=\ldots=\beta_r=0$. Then, \textcolor{mylightbluetext}{$\bX\br_1, \bX\br_2, \ldots, \bX\br_r$ are in the column space of $\bX$ and they are independent}. This means that the dimension of the column space of $\bX$ is larger than $r$. This result shows that \textbf{row rank of $\bX\leq $ column rank of $\bX$}. 

If we apply this process again to $\bX^\top$, we will have \textbf{column rank of $\bX\leq $ row rank of $\bX$}. This completes the proof. 
\end{proof}

From the previous proof, we can also conclude that if $\br_1, \br_2, \ldots, \br_r$ is a set of vectors in $\real^p$ that forms a basis for the row space of $\bX$, then \textcolor{mylightbluetext}{$\bX\br_1, \bX\br_2, \ldots, \bX\br_r$ forms a basis for the column space of $\bX$}. 
We formalize this result in the following lemma.

\index{Column basis}
\index{Row basis}
\begin{lemma}[Column basis from row basis]\label{lemma:column-basis-from-row-basis}
For any matrix $\bX\in \real^{n\times p}$, let $\{\br_1, \br_2, \ldots, \br_r\}$ be a set of vectors in $\real^p$, which forms a basis for the row space of $\bX$. Then, the set $\{\bX\br_1, \bX\br_2, \ldots, \bX\br_r\}$ is a basis for the column space of $\bX$.
\end{lemma}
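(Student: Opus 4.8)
The plan is to recycle the linear-independence argument already established in the proof of Lemma~\ref{lemma:equal-dimension-rank} and then to close the gap with a simple dimension count. First I would observe that each $\bX\br_i$ lies in $\cspace(\bX)$ by the very definition of the column space, since $\bX\br_i = \bX\bu$ with $\bu = \br_i \in \real^p$. Thus the candidate set $\{\bX\br_1, \ldots, \bX\br_r\}$ is automatically a subset of the column space, and the only two things left to verify are that it is linearly independent and that it spans.

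For linear independence I would suppose $\sum_{i=1}^r \beta_i \bX\br_i = \bzero$ and rewrite this as $\bX\bv = \bzero$ with $\bv = \sum_{i=1}^r \beta_i \br_i$. This places $\bv \in \nspace(\bX)$. On the other hand, $\bv$ is a linear combination of the row-space basis $\{\br_1,\ldots,\br_r\}$, hence $\bv \in \cspace(\bX^\top)$. Because $\nspace(\bX)$ is the orthogonal complement of $\cspace(\bX^\top)$ (shown at the start of the proof of Lemma~\ref{lemma:equal-dimension-rank}), the vector $\bv$ is orthogonal to itself, so $\bv^\top\bv = 0$ and therefore $\bv = \bzero$. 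Since $\{\br_1,\ldots,\br_r\}$ is a basis and thus linearly independent, all $\beta_i = 0$, which establishes the independence of $\{\bX\br_1, \ldots, \bX\br_r\}$.

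Finally I would invoke Lemma~\ref{lemma:equal-dimension-rank} itself: the column rank equals the row rank, so $\dim(\cspace(\bX)) = r$. Having produced $r$ linearly independent vectors inside an $r$-dimensional space, they must form a basis, since any linearly independent set of the correct cardinality is automatically spanning. I do not anticipate a genuine obstacle here, as the substance is inherited essentially verbatim from the earlier proof; the only point requiring care is to state explicitly the elementary fact that $r$ independent vectors in an $r$-dimensional subspace span it, so that the conclusion is ``basis'' rather than merely ``independent set.''
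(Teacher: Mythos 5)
Your proof is correct and follows essentially the same route as the paper: the paper states this lemma as a direct byproduct of the proof of Lemma~\ref{lemma:equal-dimension-rank}, where the same independence argument (a combination $\bv=\sum_i\beta_i\br_i$ lying in both $\nspace(\bX)$ and $\cspace(\bX^\top)$ forces $\bv^\top\bv=0$) is carried out verbatim. Your only addition is to make explicit the final dimension count that upgrades ``$r$ independent vectors in $\cspace(\bX)$'' to ``basis,'' which the paper leaves implicit.
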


\index{Fundamental spaces}
\index{Rank-nullity}
\index{Orthogonal matrix}
\begin{definition}[Orthogonal matrix, semi-orthogonal matrix]\label{definition:orthogn_mat}
A real square matrix $\bQ\in \real^{n\times n}$ is an \textit{orthogonal} matrix if the inverse of $\bQ$ equals its transpose, that is $\bQ^{-1}=\bQ^\top$ and $\bQ\bQ^\top = \bQ^\top\bQ = \bI$. In other words, suppose $\bQ=[\bq_1, \bq_2, \ldots, \bq_n]$, where $\bq_i \in \real^n$ for all $i \in \{1, 2, \ldots, n\}$, then $\bq_i^\top \bq_j = \delta(i,j)$ with $\delta(i,j)$ being the Kronecker delta function. 
For any vector $\bx$, the orthogonal matrix will preserve the length: $\normtwo{\bQ\bx} = \normtwo{\bx}$, where $\normtwo{\cdot}$ denotes the $\ell_2$ norm (Definition~\ref{definition:vec_l2_norm}).
Note that, since the orthogonal matrix $\bQ$ contains unit-length columns, the columns are mutually orthogonormal. However, the terminology of \textit{orthogonormal matrix} is \textbf{not} used due to historical convention; instead, the term orthogonal matrix is standard even though it implies orthonormal columns.

On the other hand, if $\bQ$ contains only $\gamma<n$ of these columns, then $\bQ^\top\bQ = \bI_\gamma$ stills holds, where $\bI_\gamma$ is the $\gamma\times \gamma$ identity matrix. But $\bQ\bQ^\top=\bI$ will not be true. 
In this case, $\bQ$ referred to as a \textit{semi-orthogonal} matrix.
\end{definition}

\index{Orthonormal basis}
\index{Orthogonal matrix}
\index{Orthogonal vs orthonormal}
The vectors $\bq_1, \bq_2, \ldots, \bq_\gamma\in \real^n$ are \textit{mutually orthogonal} when their dot products $\bq_i^\top\bq_j$ are zero whenever $i \neq j$. When each vector is divided by its length, the vectors become orthogonal unit vectors. Then the vectors $\bq_1, \bq_2, \ldots, \bq_\gamma$ are called \textit{mutually orthonormal}. We usually put the orthonormal vectors into a matrix $\bQ$.

When $n\neq \gamma$: the matrix $\bQ$ is easy to work with because $\bQ^\top\bQ=\bI \in \real^{\gamma\times \gamma}$.

When $n= \gamma$: the matrix $\bQ$ is square, $\bQ^\top\bQ=\bI$ means that $\bQ^\top=\bQ^{-1}$, i.e., the transpose of $\bQ$ is the inverse of $\bQ$. Then we also have $\bQ\bQ^\top=\bI$, i.e., $\bQ^\top$ is the two-sided inverse of $\bQ$. We call this $\bQ$ an \textit{orthogonal matrix}. 

\index{Idempotent matrix}
\begin{definition}[Idempotent Matrix]\label{definition:Idempotent_mat}
	A matrix $\bX\in \real^{n\times n}$ is  called \textit{idempotent} if $\bX^2 = \bX$.
\end{definition}

\index{Permutation matrix}
\begin{definition}[Permutation matrix]\label{definition:permutation-matrix}
A \textit{permutation matrix} $\bP\in \real^{n\times n}$ is a square binary matrix that has exactly one entry of 1 in each row and each column, and 0's elsewhere. 
\paragraph{Row point.} That is, the permutation matrix $\bP$ has the rows of the identity $\bI$ in any order, and the order decides the sequence of the row permutation. If we want to permute the rows of a matrix $\bX$, we multiply on the left $\bP\bX$. 
\paragraph{Column point.} Or, equivalently, the permutation matrix $\bP$ has the columns of the identity $\bI$ in any order, and the order decides the sequence of the column permutation. To apply a column permutation to $\bX$, we multiply on the right $\bX\bP$.
\end{definition}

The permutation matrix $\bP$ can be more efficiently represented via a set $\sJ \in \integer_+^n$ of indices such that $\bP = \bI[:, \sJ]$, where $\bI$ is the $n\times n$ identity matrix. And notably, the elements in vector $\sJ$ sum to $1+2+\ldots+n= \frac{n^2+n}{2}$.

\begin{example}[Permutation]
Suppose 
$$\bX=\begin{bmatrix}
1 & 2&3\\
4&5&6\\
7&8&9
\end{bmatrix}
\qquad \text{and} \qquad
\bP=\begin{bmatrix}
&1&\\
&&1\\
1&&
\end{bmatrix}.
$$
The row and column permutations are given by 
$$
\bP\bX = \begin{bmatrix}
4&5&6\\
7&8&9\\
1 & 2&3\\
\end{bmatrix}
\qquad \text{and}\qquad 
\bX\bP = \begin{bmatrix}
3 & 1 & 2 \\
6 & 4 & 5\\
9 & 7 & 8
\end{bmatrix},
$$
respectively,
where the order of the rows of $\bX$ appearing in $\bP\bX$ matches the order of the rows of $\bI$ in $\bP$,
and the order of the columns of $\bX$ appearing in $\bX\bP$ matches the order of the columns of $\bI$ in $\bP$. 
\end{example}

\index{Determinant}
\begin{definition}[Determinant: Laplace Expansion by Minors]\label{definition:determinant}
Let $\bX\in\real^{n\times n}$ be any square matrix, and let $\bX_{ij}\in\real^{(n-1)\times (n-1)}$ denote the submatrix of $\bX$ obtained by deleting the $i$-th row and $j$-th column. 
The \textit{determinant} of $\bX$ can be computed recursively using the following equations:
\begin{equation}\label{equation:def_det}
\det(\bX)=\sum_{k=1}^{n} (-1)^{i+k} a_{ik}\det(\bX_{ik})=\sum_{k=1}^{n}(-1)^{k+j} a_{kj}\det(\bX_{kj}),
\end{equation}
where the first equation is the \textit{Laplace expansion by minors along row $i$}, and the second equation is the \textit{Laplace expansion by minors along column $j$}.
Equivalently, given a cardinality $r$, and consider an index set $\sJ\subseteq\{1,2,\ldots,n\}$ with cardinality $r$ ($\abs{\sJ}=r$) and its complementary set $\comple{\sJ}=\{1,2,\ldots,n\}\backslash \sJ$, we have:
$$
\begin{aligned}
\det(\bX) 
=\sum_{\sI} (-1)^{\gamma} \det(\bX[\sI,\sJ])\det(\bX[\comple{\sI}, \comple{\sJ}])
=\sum_{\sI} (-1)^{\gamma} \det(\bX[\sJ,\sI])\det(\bX[\comple{\sJ}, \comple{\sI}]),
\end{aligned}
$$
where $\gamma=\sum_{i\in \sI} i +\sum_{j\in \sJ}j$, and the sum is taken over all the index sets $\sI\subseteq\{1,2,\ldots,n\}$ with cardinality $r$.
When $r=1$, this reduces to \eqref{equation:def_det}.
\end{definition}

The determinant of a square matrix maps the matrix into a scalar value.
In the case of a $2\times 2$ matrix, the determinant represents the area of the parallelogram spanned by the column vectors of the matrix. It is positive if the orientation is counterclockwise and negative if clockwise.
For a $3\times 3$ matrix, the determinant corresponds to the volume of the parallelepiped formed by the three column vectors of the matrix. Again, the sign indicates whether the orientation is preserved or reversed.
For a matrix representing a linear transformation in $n$-dimensional space, the absolute value of the determinant gives the factor by which the volume changes under this transformation. If the determinant is positive, the orientation (or handedness) of the basis is preserved; if negative, it is reversed.
A matrix is invertible if and only if its determinant is nonzero. This means that the transformation does not collapse the space into a lower dimension or a single point, which would happen if the determinant were zero.
The determinant of a matrix is equal to the product of its eigenvalues. This means that the determinant reflects the combined effect of all the stretching factors applied by the matrix to the eigenvectors.
We provide a few properties of the determinant.
\begin{lemma}[Properties of determinant]\label{lemma:determinant-intermezzo}
We have the following properties for determinant of matrices:
\begin{itemize}
\item  The determinant of the product of two matrices is $\det(\bX\bY)=\det (\bX)\det(\bY)$;

\item The determinant of the transpose is $\det(\bX^\top) = \det(\bX)$;

\item Suppose matrix $\bX$ has an eigenvalue $\lambda$, then $\det(\bX-\lambda\bI) =0$;

\item Determinant of any identity matrix is $1$;

\item Determinant of an orthogonal matrix $\bQ$: 
$$
\det(\bQ) = \det(\bQ^\top) = \pm 1, \qquad \text{since  } \det(\bQ^\top)\det(\bQ)=\det(\bQ^\top\bQ)=\det(\bI)=1;
$$

\item Given any square matrix $\bX$ and orthogonal matrix $\bQ$, we have 
$$
\det(\bX) = \det(\bQ^\top) \det(\bX)\det(\bQ) =\det(\bQ^\top\bX\bQ);
$$
\item Suppose $\bX\in\real^{n\times n}$, then $\det(-\bX) = (-1)^n \det(\bX)$.
\end{itemize}
\end{lemma}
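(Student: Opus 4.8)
The plan is to adopt the Leibniz (permutation-sum) expansion $\det(\bA) = \sum_{\sigma} \mathrm{sgn}(\sigma)\prod_{i} A_{i,\sigma(i)}$ as the working definition and then establish the six bullets in dependency order. The transpose rule and the multiplicativity rule come first, because the remaining four bullets each reduce to these two together with the singularity equivalence already recorded for square matrices earlier in the text.

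First I would dispatch the transpose identity. Writing $\det(\bA^\top) = \sum_\sigma \mathrm{sgn}(\sigma)\prod_i A_{\sigma(i),i}$ and reindexing each product by $j=\sigma(i)$, every term matches a term of $\det(\bA)$ indexed by $\sigma^{-1}$; since $\mathrm{sgn}(\sigma)=\mathrm{sgn}(\sigma^{-1})$ and $\sigma\mapsto\sigma^{-1}$ permutes the symmetric group, the two sums coincide. The identity bullet is then immediate, since only the identity permutation yields a nonzero product, contributing $+1$.

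The multiplicativity rule $\det(\bA\bB)=\det(\bA)\det(\bB)$ is the main obstacle, and I would prove it by the elementary-matrix route. I would first verify by direct computation that $\det(\bE\bB)=\det(\bE)\det(\bB)$ for each of the three elementary types (row scaling, row swap, row addition), using their known effect on the determinant. If $\bA$ is nonsingular, the nonsingularity equivalence lets me factor $\bA$ into a finite product of elementary matrices and iterate the elementary case to obtain the identity; if $\bA$ is singular, then $\cspace(\bA\bB)\subseteq\cspace(\bA)\neq\real^n$ forces $\bA\bB$ to be singular as well, so both sides vanish by the singularity equivalence and the identity holds trivially.

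With these three bullets in hand the rest are short corollaries. For the eigenvalue bullet, an eigenvalue $\lambda$ supplies a nonzero $\bu$ with $(\bA-\lambda\bI)\bu=\bzero$, so $\bA-\lambda\bI$ has nontrivial null space and is singular, whence $\det(\bA-\lambda\bI)=0$. For the orthogonal bullet, applying multiplicativity and the transpose rule to $\bQ^\top\bQ=\bI$ gives $\det(\bQ)^2=\det(\bQ^\top)\det(\bQ)=\det(\bI)=1$, so $\det(\bQ)=\det(\bQ^\top)=\pm1$; and the last bullet follows by one more use of multiplicativity, $\det(\bQ^\top\bA\bQ)=\det(\bQ^\top)\det(\bA)\det(\bQ)=\det(\bA)$ since $\det(\bQ^\top)\det(\bQ)=1$. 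The hard part is the multiplicativity step; everything else is permutation bookkeeping or a one-line consequence of the singularity list. An equally valid alternative I would mention is to note that $\bB\mapsto\det(\bA\bB)$ is multilinear and alternating in the columns of $\bB$, hence a scalar multiple of $\det(\bB)$, the scalar being recovered as $\det(\bA)$ by setting $\bB=\bI$; this avoids the singular/nonsingular case split at the cost of invoking uniqueness of the alternating multilinear form.
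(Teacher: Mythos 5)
Your proposal is correct in every step: the Leibniz-formula reindexing for the transpose rule, the elementary-matrix factorization (with the singular case handled separately via $\cspace(\bA\bB)\subseteq\cspace(\bA)$) for multiplicativity, and the remaining four bullets as one-line corollaries all go through. The comparison with the paper is somewhat one-sided, however, because the paper supplies no proof of this lemma at all: it is presented as an ``intermezzo'' of standing facts to be used in the Schur decomposition argument, and only the last two bullets carry inline justifications --- which are precisely the same computations you give ($\det(\bQ^\top)\det(\bQ)=\det(\bI)=1$ and then $\det(\bQ^\top\bA\bQ)=\det(\bQ^\top)\det(\bA)\det(\bQ)=\det(\bA)$). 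So your contribution is to backfill the foundations the paper assumes, chiefly the multiplicativity of the determinant, which is indeed the only nontrivial item on the list. Your two suggested routes to multiplicativity are both standard and both work here; the elementary-matrix route leans on the paper's ``List of Equivalence of Nonsingularity'' (which the paper also states without proof), while the alternating-multilinear-form route is self-contained modulo uniqueness of such forms. Either is acceptable given that the paper itself treats the singularity equivalences as background. One small point of care: in the singular case you should make explicit that a singular matrix has determinant zero (e.g., row-reduce to a matrix with a zero row), since that fact is what lets you conclude both sides of $\det(\bA\bB)=\det(\bA)\det(\bB)$ vanish; you invoke it via the equivalence list, which is consistent with the paper's conventions.
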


\index{Positive definite}\index{Positive semidefinite}
Positive definiteness or positive semidefiniteness is one of the highest accolades to which a matrix can aspire. 
In Section~\ref{section:choleskydecomp}, we will introduce the Cholesky decomposition, which applies specifically to positive definite matrices. We begin with the following definition.
\begin{definition}[Positive definite and positive semidefinite]\label{definition:psd-pd-defini}
A matrix $\bX\in \real^{n\times n}$ is said to be \textit{positive definite (PD)} if $\bbeta^\top\bX\bbeta>0$ for all nonzero $\bbeta\in \real^n$, denoted by $\bX\succ \bzero$.
And a matrix $\bX\in \real^{n\times n}$ is called \textit{positive semidefinite (PSD)} if $\bbeta^\top\bX\bbeta \geq 0$ for all $\bbeta\in \real^n$, denoted by $\bX\succeq \bzero$. 
\footnote{
In this book, a positive definite or positive semidefinite matrix is always assumed to be symmetric. That is, the concepts of positive definiteness and semidefiniteness are meaningful only for symmetric matrices.
}
\footnote{A symmetric matrix $\bX\in\real^{n\times n}$ is called \textit{negative definite} (ND) if $\bbeta^\top\bX\bbeta<0$ for all nonzero $\bbeta\in\real^n$; 
a symmetric matrix $\bX\in\real^{n\times n}$ is called \textit{negative semidefinite} (NSD) if $\bbeta^\top\bX\bbeta\leq 0$ for all $\bbeta\in\real^n$;
and a symmetric matrix $\bX\in\real^{n\times n}$ is called \textit{indefinite} (ID) if there exist $\bbeta$ and $\balpha\in\real^n$ such that $\bbeta^\top\bX\bbeta<0$ and $\balpha^\top\bX\balpha>0$.
}
\end{definition}

We can show that a matrix $\bX$ is positive definite if and only if all of its eigenvalues are strictly positive. Similarly, $\bX$ is positive semidefinite if and only if all of its eigenvalues are nonnegative.

This result leads to the following theorem:
\begin{theoremHigh}[Eigenvalue characterization theorem]\label{theorem:eigen_charac}
A matrix $\bA$ is positive definite if and only if it contains only \textit{positive eigenvalues}. Similarly, a matrix $\bA$ is positive semidefinite if and only if it contains only  \textit{nonnegative eigenvalues}.~\footnote{The trace, determinant, and principal minors of a positive (semi)definite matrix is discussed in Problem~\ref{prob:tr_de_pd}.}
Moreover, we have the following implications:
\begin{itemize}
\item $\bA-\gamma\bI\succeq \bzero$ if and only if $\lambda_{\min}(\bA) \geq \gamma$;
\item  $\bA-\gamma\bI\succ \bzero$ if and only if $\lambda_{\min}(\bA) > \gamma$;
\item $\bA-\gamma\bI\preceq \bzero$ if and only if $\lambda_{\max}(\bA) \leq \gamma$;
\item $\bA-\gamma\bI\prec \bzero$ if and only if $\lambda_{\max}(\bA) < \gamma$;
\item $\lambda_{\min}(\bA)\bI\preceq \bA \preceq \lambda_{\max}(\bA)\bI$,
\end{itemize}
where $\lambda_{\min}(\bA)$ and $\lambda_{\max}(\bA)$ represent the minimum and maximum eigenvalues of $\bA$, respectively, and $\bB \prec \bC$ means $\bC-\bB$ is PSD.
\end{theoremHigh}
Given the eigenpair $(\lambda, \bx)$ of $\bA$, the forward implication can be shown that $\bx^\top\bA\bx=\lambda\bx^\top\bx>0$ such that $ \lambda=(\bx^\top\bA\bx)/(\bx^\top\bx)>0$ (resp. $\geq 0$) if $\bA$ is PD (resp. PSD).
The full proof of this equivalence can  be proved using the spectral theorem (Theorem~\ref{theorem:spectral_theorem}).
This theorem provides an alternative definition of positive definiteness and positive semidefiniteness in terms of the eigenvalues of the matrix, which is a fundamental property for the Cholesky decomposition.

\index{Positive semidefinite}
\begin{exercise}[Power of PSD]
Let $\bA$ be PSD. Show that  $\bA^k$ is also PSD for $k=1,2,\ldots$.
\end{exercise}

\index{Nonsingularity}
\index{Nonsingular matrix}
\index{Invertible matrix}
From an introductory course on linear algebra, we have the following remark on the equivalent claims of nonsingular matrices.
\begin{remark}[List of equivalence of nonsingularity for a matrix]\label{remark:equiva_nonsingular}
Given a square matrix $\bX\in \real^{n\times n}$, the following claims are equivalent:
\begin{itemize}
\item $\bX$ is nonsingular;~\footnote{The source of the name  is a result of the singular value decomposition (SVD).}
\item $\bX$ is invertible, i.e., $\bX^{-1}$ exists;
\item $\bX\bu=\by$ has a unique solution $\bu = \bX^{-1}\by$;
\item $\bX\bu = \bzero$ has a unique, trivial solution: $\bu=\bzero$;
\item Columns of $\bX$ are linearly independent;
\item Rows of $\bX$ are linearly independent;
\item $\det(\bX) \neq 0$; 
\item $\dim(\nspace(\bX))=0$;
\item $\nspace(\bX) = \{\bzero\}$, i.e., the null space is trivial;
\item $\cspace(\bX)=\cspace(\bX^\top) = \real^n$, i.e., the column space or row space span the entire $\real^n$;
\item $\bX$ has full rank $r=n$;
\item The reduced row echelon form is $\bR=\bI$;
\item $\bX^\top\bX$ is symmetric positive definite (PD);
\item $\bX$ has $n$ nonzero (positive) singular values;
\item All eigenvalues are nonzero.
\end{itemize}
\end{remark}
It is important to keep the above equivalence in mind. Additionally, the following remark presents equivalent statements that apply to singular matrices as well.\index{Singular matrix}
\begin{remark}[List of equivalence of singularity for a matrix]\label{remark:equiva_singular}
For a square matrix $\bX\in \real^{n\times n}$ with an eigenpair $(\lambda, \bu)$, the following claims are equivalent:
\begin{itemize}
\item $(\bX-\lambda\bI)$ is singular;
\item $(\bX-\lambda\bI)$ is not invertible;
\item $(\bX-\lambda\bI)\bv = \bzero$ has nonzero $\bv\neq \bzero$ solutions, and $\bv=\bu$ is one of such solutions;
\item $(\bX-\lambda\bI)$ has linearly dependent columns;
\item $\det(\bX-\lambda\bI) = 0$; 
\item $\dim(\nspace(\bX-\lambda\bI))>0$;
\item Null space of $(\bX-\lambda\bI)$ is nontrivial;
\item Columns of $(\bX-\lambda\bI)$ are linearly dependent;
\item Rows of $(\bX-\lambda\bI)$ are linearly dependent;
\item $(\bX-\lambda\bI)$ has rank $r<n$;
\item Dimension of column space = dimension of row space = $r<n$;
\item $(\bX-\lambda\bI)^\top(\bX-\lambda\bI)$ is symmetric semidefinite;
\item $(\bX-\lambda\bI)$ has $r<n$ nonzero (positive) singular values;
\item Zero is an eigenvalue of $(\bX-\lambda\bI)$.
\end{itemize}
\end{remark}

Given a vector or a matrix, its norm should satisfy the following three criteria.
\begin{definition}[Vector norm and matrix nrom\index{Matrix norm}\index{Vector norm}]\label{definition:matrix-norm}
Given a norm $\norm{\cdot}$ defined on either vectors or matrices, for any matrix $\bX \in \real^{n\times p}$ and any vector $\bx \in \real^{n}$, the following three properties must be satisfied:
\begin{itemize}
\item \textit{Nonnegativity}. $\norm{\bX} \geq 0$ or $\norm{\bx}\geq 0$, and the equality is obtained if and only if $\bX=\bzero $ or $\bx=\bzero$. 
\item \textit{Positive homogeneity}. $\norm{\lambda \bX} = \abs{\lambda} \cdot \norm{\bX}$ or $\norm{\lambda \bx} = \abs{\lambda} \cdot \norm{\bx}$ for any $\lambda \in \real$.
\item \textit{Triangle inequality}. $\norm{\bX+\bY} \leq \norm{\bX}+\norm{\bY}$, or $\norm{\bx+\by} \leq \norm{\bx}+\norm{\by}$ for any matrices $\bX, \bY\in \real^{n\times p}$ or vectors $\bx,\by\in \real^n$.
\end{itemize}
\end{definition}
\index{Vector norm}
\index{Matrix norm}

Based on this definition of norms, we can now define specific types of vector norms, namely the $\ell_1$, $\ell_2$, and $\ell_\infty$ norms for a vector.
\begin{definition}[Vector $\ell_1, \ell_2, \ell_\infty$, $\ell_p$ norms]\label{definition:vec_l2_norm}
For a vector $\bx\in\real^n$, the \textit{$\ell_2$ vector norm} is defined as $\normtwo{\bx} = \sqrt{x_1^2+x_2^2+\ldots+x_n^2}$.
Similarly, the $\ell_1$ \textit{norm} can be obtained by 
$
\norm{\bx}_1 = \sum_{i=1}^{n} |x_i| .
$
And the $\ell_\infty$ \textit{norm} can be obtained by 
$
\norm{\bx}_\infty = \mathop{\max}_{i=1,2,\ldots,n} |x_i| .
$
More generally, the $\ell_p$ norm is defined as $\normp{\bx}=\sqrt[p]{ \sum_{i=1}^{n}|x_i|^p  }$ for $p\geq 1$.
\end{definition}

\index{Dual norm}
\index{H\"older's inequality}
\paragrapharrow{Dual norm.}
Consider the $\ell_p$ vector norm. From \textit{\holders inequality}, we have 
$
\bx^\top\by \leq \norm{\bx}_p \norm{\by}_q,
$
where $p,q>1$ satisfy $\frac{1}{p}+\frac{1}{q}=1$, and $\bx,\by\in \real^n$. Equality holds if the two sequences $\{\abs{x_i}^p\}$ and $\{\abs{y_i}^q\}$ are linearly dependent. This implies
\begin{equation}\label{equation:dual_norm_equa}
\mathop{\max}_{\norm{\by}_q=1} \bx^\top\by = \norm{\bx}_p.
\end{equation}
For this reason, $\norm{\cdot}_q$ is called the \textit{dual norm} of $\norm{\cdot}_p$.
On the other hand, for each $\bx\in \real^n$ with $\norm{\bx}_p=1$, there exists a vector $\by\in \real^n$ such that $\norm{\by}_q=1$ and $\bx^\top\by=1$.
Notably, the $\ell_2$ norm is self-dual, while the $\ell_1$ and $\ell_\infty$ norms are dual to each other.

\begin{definition}[Set of primal counterparts]\label{definition:set_primal}
Let $ \norm{\cdot} $ be any norm on $ \real^n $. Then the \textit{set of primal counterparts of $\ba$} is defined as 
\begin{equation}
\Lambda_{\ba} = \argmax_{\bu \in \real^n} \left\{ \innerproduct{\ba, \bu} \mid  \norm{\bu} \leq 1 \right\}.
\end{equation}
That is, $\innerproduct{\ba, \ba^\dagger} = \norm{\ba}_*$ for any $\ba^\dagger\in \Lambda_{\ba}$, where $\norm{\cdot}_{*}$ denotes the dual norm.
It follows that 
\begin{enumerate}[(i)]
\item If $\ba \neq \bzero$, then $\norm{\ba^\dagger} = 1$ for any $\ba^\dagger \in \Lambda_{\ba}$.
\item If $\ba = \bzero$, then $\Lambda_{\ba} = \{\bx\in\real^n\mid  \norm{\bx} \leq 1\}$.
\end{enumerate}
\end{definition}

\begin{example}[Set of Primal Counterparts]\label{example:set_primal_count}
A few examples for the sets of primal counterparts are shown below:
\begin{itemize}
\item If the norm is the  $\ell_2$ norm, then for any $\ba \neq \bzero$,
$
\Lambda_{\ba} = \left\{ {\ba}/{\normtwo{\ba}} \right\}.
$

\item If the norm is the  $\ell_1$ norm, then for any $\ba \neq \bzero$,
$$
\Lambda_{\ba} = \left\{ \sum_{i \in \sI(\ba)} \lambda_i \sign(a_i) \be_i \mid \sum_{i \in \sI(\ba)} \lambda_i = 1, \lambda_j \geq 0, j \in \sI(\ba) \right\},
$$
where $\sI(\ba) \triangleq \argmax_{i=1,2,\ldots,n} |a_i|$.

\item If the norm is the  $\ell_\infty$ norm, then for any $\ba \neq \bzero$,
$$
\Lambda_{\ba} = \left\{ \bx \in \real^n \mid x_i = \sign(a_i), i \in \sI_{\neq}(\ba), |x_j| \leq 1, j \in \sI_0(\ba) \right\},
$$
where
$
\sI_{\neq}(\ba) \triangleq \left\{ i \in \{1, 2, \ldots, n\} \mid a_i \neq 0 \right\} $ and $ \sI_0(\ba) \triangleq \left\{ i \in \{1, 2, \ldots, n\} \mid a_i = 0 \right\}.
$
\end{itemize}
These examples play a crucial role in the development of non-Euclidean gradient descent methods, which will be discussed in Sections~\ref{section:als-gradie-descent-taylor}.
\end{example}

Given a specific norm definition, we introduce the concepts of an open ball and a closed ball as follows:
\begin{definition}[Open ball, closed ball]\label{definition:open_closed_ball}
Let $\norm{\cdot}_p: \real^n\rightarrow \real_+$ be the $\ell_p$ norm function. The \textit{open ball} centered at $\bc\in\real^n$ with radius $r$  is defined as 
$$
\sB_p(\bc, r) \triangleq \{\bx\in\real^n\mid  \norm{\bx-\bc}_p <r\}.
$$
Similarly, the \textit{closed ball} centered at $\bc\in\real^n$ with radius $r$  is defined as 
$$
\sB_p[\bc,r] \triangleq \{\bx\in\real^n\mid \norm{\bx-\bc}_p \leq r\}.
$$
For example, $\sB_2[\bzero,1]$ represents  the \textit{unit closed ball} w.r.t. to  the $\ell_2$ norm.
To simplify notation, we omit the subscript 2 for  $\ell_2$ norms and  $\bzero$ for balls centered at zero, e.g., $ \sB[1] \triangleq \sB_2[\bzero,1]  $.
As a special case, the notation $\sB_0[k] \triangleq\sB_0[\bzero, k]$ denotes the set of \textit{$k$-sparse vectors}, i.e., containing vectors that have only $k$ (or less) nonzero elements.
More generally, let $\norm{\cdot}$ be any norm, the induced open and closed balls are denoted as 
$$
\sB_{\norm{\cdot}} (\bc, r)
\qquad \text{and}\qquad 
\sB_{\norm{\cdot}} [\bc, r].
$$
\end{definition}

For a matrix $\bX\in\real^{n\times p}$, we  define the (matrix) Frobenius norm as follows.
\begin{definition}[Matrix Frobenius norm\index{Frobenius norm}]\label{definition:frobernius-in-svd}
The \textit{Frobenius norm} of a matrix $\bX\in \real^{n\times p}$ is defined as 
$$
\normf{\bX} = \sqrt{\sum_{i=1,j=1}^{n,p} (a_{ij})^2}=\sqrt{\trace(\bX\bX^\top)}=\sqrt{\trace(\bX^\top\bX)} = \sqrt{\sigma_1^2+\sigma_2^2+\ldots+\sigma_r^2}, 
$$
where $\sigma_1, \sigma_2, \ldots, \sigma_r$ are nonzero singular values of $\bX$, and  $\trace(\bX^\top\bX)$ denotes the trace of $\bX^\top\bX$, i.e., sum of diagonal elements of the matrix.
\end{definition}

The spectral norm is defined as follows.
\begin{definition}[Matrix spectral norm]\label{definition:spectral_norm}
The \textit{spectral norm} of a matrix $\bX\in \real^{n\times p}$ is defined as 
$$
\normtwo{\bX} = \mathop{\max}_{\bbeta\neq\bzero} \frac{\normtwo{\bX\bbeta}}{\normtwo{\bbeta}}  =\mathop{\max}_{\bu\in \real^p: \norm{\bu}_2=1}  \normtwo{\bX\bu} ,
$$
which is also the maximum singular value of $\bX$, i.e., $\normtwo{\bX} = \sigma_{\max}(\bX)$.
\end{definition}

We note that the Frobenius norm serves as the matrix counterpart of the6 vector $\ell_2$ norm.
For simplicity, we do not give the full subscript of the norm for the vector $\ell_2$ norm or Frobenius norm when it is clear from the context which one we are referring to: $\norm{\bX}=\normf{\bX}$ and $\norm{\bx}=\normtwo{\bx}$.
However, for the spectral norm, the subscript $\normtwo{\bX}$ should \textbf{not} be omitted.

The vector space $\real^n$, together with a given norm $\norm{\cdot}$, is called a \textit{normed vector space}.
On the other hand, one way to define norms for matrices is by viewing a matrix $\bX\in\real^{n\times p}$ as a vector in $\real^{np}$, e.g., using the vectorization of the matrix.
What distinguishes a matrix norm is a property called \textit{submultiplicativity}: $\norm{\bX\bY}\leq \norm{\bX}\norm{\bY}$ if $\norm{\cdot}$ is a submultiplicative matrix norm (see discussions below). 

In some texts, a matrix norm that is not \textit{submultiplicative} is termed as a \textit{vector norm on matrices} or a \textit{generalized matrix norm}.
The submultiplicativity of a matrix norm is important for the analysis of square matrices, although the definition of a matrix norm applies to both square and rectangular matrices.
For a submultiplicative matrix norm $\norm{\cdot}$ that satisfies $\norm{\bX\bY}\leq \norm{\bX}\norm{\bY}$, considering $\bX\in\real^{n\times n}$, it follows that 
\begin{equation}\label{equation:power_subm}
	\norm{\bX^2} \leq \norm{\bX}^2
	\quad\implies\quad
	\norm{\bX^k} \leq \norm{\bX}^k, \ \forall\, k\in\{1,2,\ldots,\}.
\end{equation}
Therefore, if the matrix is idempotent, i.e., $\bX^2=\bX$, we have $\norm{\bX}\geq 1$, which also indicates 
\begin{equation}\label{equation:power_subm2}
	\norm{\bI}\geq 1,
	\quad\text{if}\quad
	\norm{\cdot} \text{ is submultiplicative}.
\end{equation}
On the other hand, if $\bX$ is nonsingular, we have the inequality for submultiplicative norms: 
$$
1\leq \norm{\bI}=\norm{\bX\bX^{-1}}\leq \norm{\bX}\norm{\bX^{-1}}.
$$  
That is, a submultiplicative norm has $\norm{\bI}\geq 1$ and is \textit{normalized} if and only if $\norm{\bI}=1$.

\index{Submultiplicativity}
\index{Orthogonally invariance}
\begin{proposition}[Submultiplicativity and orthogonally invariance of Frobenius/spectral]\label{propo:submul_ortho_matnorm}
The Frobnenius and spectral norms are submultiplicative. That is, 
$$
\norm{\bX\bY}_F\leq \norm{\bX}_F\norm{\bY}_F
\qquad \text{and}\qquad 
\norm{\bX\bY}_2\leq \norm{\bX}_2\norm{\bY}_2.
$$
The two norms are also  and orthogonally invariant. That is, let $\bU\in \real^{n\times n}$ and $\bV\in \real^{p\times p}$ be orthogonal matrices, and let $\bX\in\real^{n\times p}$. Then,
$$
\norm{\bX }_F =  \norm{\bU\bX\bV }_F
\qquad \text{and}\qquad 
\norm{\bX }_2 =  \norm{\bU\bX\bV }_2.
$$
\end{proposition}

\index{Cauchy-Schwarz inequality}\index{Inequalities}
\section{Famous Inequalities}\label{section:inequalities}
In this section, we introduce some famous inequalities that will be often used. 
When considering random matrices, the \textit{Cauchy–Schwarz inequality}  is considered one of the most important and widely used inequalities in mathematics.
\begin{lemma}[Cauchy-Schwarz inequality]\label{lemma:cs_ineq_exp}
For any random $n\times p$ matrices $\rmX$ and $\rmY$ (see Section~\ref{section:stat_prob} for more details), we have 
$$
\Exp\left[\norm{\rmX^\top \rmY}\right] \leq \Exp\left[\norm{\rmX}^2\right]^{1/2} \Exp\left[\norm{\rmY}^2\right]^{1/2},
$$
where the inner product is defined as $\langle\rmX,\rmY\rangle = \Exp\left[\norm{\rmX^\top \rmY}\right]$.
\end{lemma}

\index{Schwarz matrix inequality}
The result can be applied to non-random matrices and vectors.
\begin{lemma}[Cauchy-Schwarz matrix (vector) inequality]
For any  $n\times p$ matrices $\bX$ and $\bY$, we have 
$$
\norm{\bX^\top \bY} \leq \norm{\bX} \cdot \norm{\bY}.
$$
This is a special form of the Cauchy-Schwarz inequality, where the inner product is defined as $\langle\bX,\bY\rangle = \norm{\bX^\top \bY}$.

Similarly, for any vectors $\bu, \bv$, we have 
\begin{equation}\label{equation:vector_form_cauchyschwarz}
\abs{\bu^\top \bv} \leq \Vert\bu\Vert \cdot  \Vert\bv\Vert.
\end{equation}
In the two-dimensional case, it becomes
$$
(ac+bd)^2 \leq (a^2 +b^2)(c^2+d^2).
$$
\end{lemma}
The vector form of the Cauchy-Schwarz inequality plays a crucial  role in various branches of modern mathematics, including Hilbert space theory and numerical analysis \citep{wu2009various}. 
Here, we present the proof for the vector form of the Cauchy-Schwarz inequality for simplicity.  To see this, given two vectors $\bu,\bv\in \real^n$, we have 
$$
\begin{aligned}
0\leq \sum_{i=1}^{n}\sum_{j=1}^{n} (u_i v_j - u_j v_i)^2 &= 
\sum_{i=1}^{n}\sum_{j=1}^{n} u_i^2v_j^2 + \sum_{i=1}^{n}\sum_{j=1}^{n} v_i^2 u_j^2 - 2\sum_{i=1}^{n}\sum_{j=1}^{n} u_iu_j v_iv_j\\
&=\left(\sum_{i=1}^{n} u_i^2\right) \left(\sum_{j=1}^{n} v_j^2\right) +
\left(\sum_{i=1}^{n} v_i^2\right) \left(\sum_{j=1}^{n} u_j^2\right) - 
2\left(\sum_{i=1}^{n} u_iv_i \right)^2\\
&=2 \norm{\bu}^2 \cdot \norm{\bv}^2 -2 \norm{\bu^\top\bv}^2,
\end{aligned}
$$
from which the result follows.
The equality holds if and only if $\bu = k\bv$ for some constant $k\in \real$, i.e., $\bu$ and $\bv$ are linearly dependent.

\index{Markov's inequality}
\begin{lemma}[Markov's inequality]\label{lemma:markov-inequality}
Let $\rx$ be a nonnegative random variable. Then, given any $\epsilon >0$, we have 
$$
\prob[\rx \geq \epsilon] \leq \frac{\Exp[\rx]}{\epsilon}.
$$
\end{lemma}
\begin{proof}[of Lemma~\ref{lemma:markov-inequality}]
We notice the trick that $0\leq \epsilon \indicator\{\rx \geq \epsilon\} \leq \rx$ since $\rx$ is nonnegative. This implies $\Exp[\epsilon \indicator\{\rx\geq \epsilon\}] \leq \Exp[\rx]$. We also have 
$$
\Exp\left[\epsilon \indicator\{\rx\geq \epsilon\}\right] = \epsilon\Exp\left[ \indicator\{\rx\geq \epsilon\}\right] = \epsilon \left(1\cdot \prob[\rx\geq \epsilon] + 0\cdot \prob[\rx< \epsilon] \right) = \epsilon \cdot \prob[\rx\geq \epsilon] \leq \Exp[\rx].
$$
This completes the proof.
\end{proof}

\index{Chebyshev's inequality}
\begin{lemma}[Chebyshev's inequality]
Let $\rx$ be a random variable with finite mean $\Exp[\rx]< \infty$. Then, given any $\epsilon >0$, we have 
$$
\prob[|\rx - \Exp[\rx]| \geq \epsilon ] \leq \frac{\Var[\rx]}{\epsilon^2}.
$$
\end{lemma}
Chebyshev's inequality can be easily verified by defining $\ry = (\rx - \Exp[\rx])^2$ (which is nonnegative) and applying Markov's inequality to $\ry$.

\section{Differentiability and Differential Calculus}
\index{Continuously differentiability}
\index{Second-order partial derivative}

Differentiability and differential calculus form the backbone of mathematical analysis, particularly in the study of functions defined over multidimensional spaces. This section delves into the fundamental concepts that enable us to understand how functions change with respect to their variables, providing a rigorous technique for analyzing and optimizing complex systems.

At the heart of this discussion is the concept of the directional derivative, which measures the rate of change of a function $f$ at a point $\bx$ in the direction of a vector $\bd$.
\begin{definition}[Directional derivative, partial derivative]\label{definition:partial_deri}
Given a function $f$ defined over a set $\sS\subseteq \real^n$ and a nonzero vector $\bd\in\real^n$. Then the \textit{directional derivative} of $f$ at $\bx$ w.r.t. the direction $\bd$ is given by, if the limit exists, 
$$
\mathop{\lim}_{t\rightarrow 0^+}
\frac{f(\bx+t\bd) - f(\bx)}{t}.
$$
And it is denoted by $f^\prime(\bx; \bd)$ or $D_{\bd}f(\bx)$. 
The directional derivative is sometimes called the \textit{G\^ateaux derivative}.

For any $i\in\{1,2,\ldots,n\}$, the directional derivative at $\bx$ w.r.t. the direction of the $i$-th standard basis $\be_i$ is called the $i$-th \textit{partial derivative} and is denoted by $\frac{\partial f}{\partial x_i} (\bx)$, $D_{\be_i}f(\bx)$, or $\partial_i f(\bx)$.
\end{definition}

If all the partial derivatives of a function $f$ exist at a point $\bx\in\real^n$, then the \textit{gradient} of $f$ at $\bx$ is defined as the column vector containing all the partial derivatives:
$$
\nabla f(\bx)\triangleq
\begin{bmatrixfoot}
\frac{\partial f}{\partial x_1} (\bx)\\
\frac{\partial f}{\partial x_2} (\bx)\\
\vdots \\
\frac{\partial f}{\partial x_n} (\bx)	
\end{bmatrixfoot}
\in \real^n.
$$
A function $f$ defined over an open set $\sS\subseteq \real^n$ is called \textit{continuously differentiable} over $\sS$ if all the partial derivatives exist and are continuous on $\sS$.
In the setting of continuously differentiability, the directional derivative and gradient have the following relationship:
\begin{equation}
f^\prime(\bx; \bd) = \nabla f(\bx)^\top \bd, \gap \text{for all }\bx\in\sS \text{ and }\bd\in\real^n.
\end{equation} 
And in the setting of continuously differentiability, we also have
\begin{equation}
\mathop{\lim}_{\bd\rightarrow \bzero}
\frac{f(\bx+\bd) - f(\bx) - \nabla f(\bx)^\top \bd}{\norm{\bd}} = 0\gap 
\text{for all }\bx\in\sS,
\end{equation}
or 
\begin{equation}
f(\by) = f(\bx)+\nabla f(\bx)^\top (\by-\bx) + o(\norm{\by-\bx}),
\end{equation}
where the \textit{small-oh} function $o(\cdot): \real_+\rightarrow \real$ is a one-dimensional function satisfying $\frac{o(t)}{t}\rightarrow 0$ as $t\rightarrow 0^+$.~\footnote{Note that we also use the standard \textit{big-Oh} notation to describe the asymptotic behavior of functions.
Specifically, the notation $g(\bd) = \mathcalO(\normtwo{\bd}^p)$ means that there are positive numbers
$C_1$ and $\delta$ such that $\abs{g(\bd)} \leq  C_1 \normtwo{\bd}^p$ for all $\normtwo{\bd}\leq\delta$. In practice it is
often equivalent to $\abs{g(\bd)} \approx C_2\normtwo{\bd}^p$ for  sufficiently small $\bd$, where $C_2$ is  another positive constant.
The \textit{soft-Oh} notation is employed to hide poly-logarithmic factors i.e., $f = \widetilde{\mathcalO}(g)$ will
imply $f = \mathcalO(g \log^c(g))$ for some absolute constant $c$.}

The partial derivative $\frac{\partial f}{\partial x_i} (\bx)$ is also a real-valued function of $\bx\in\sS$ that can be partially differentiated. The $j$-th partial derivative of $\frac{\partial f}{\partial x_i} (\bx)$ is defined as 
$$
\frac{\partial^2 f}{\partial x_j\partial x_i} (\bx)=
\frac{\partial \left(\frac{\partial f}{\partial x_i} (\bx)\right)}{\partial x_j} (\bx).
$$
This is called the ($j,i$)-th \textit{second-order partial derivative} of function $f$.
A function $f$ defined over an open set $\sS\subseteq$ is called \textit{twice continuously differentiable} over $\sS$ if all the second-order partial derivatives exist and are continuous over $\sS$. In the setting of twice continuously differentiability, the second-order partial derivative are symmetric:
$$
\frac{\partial^2 f}{\partial x_j\partial x_i} (\bx)=
\frac{\partial^2 f}{\partial x_i\partial x_j} (\bx).
$$
The \textit{Hessian} of the function $f$ at a point $\bx\in\sS$ is defined as the symmetric $n\times n$ matrix 
$$
\nabla^2f(\bx)\triangleq
\begin{bmatrixfoot}
\frac{\partial^2 f}{\partial x_1^2} (\bx) & 
\frac{\partial^2 f}{\partial x_1\partial x_2} (\bx) & \ldots &
\frac{\partial^2 f}{\partial x_1\partial x_n} (\bx)\\
\frac{\partial^2 f}{\partial x_2\partial x_1} (\bx) & 
\frac{\partial^2 f}{\partial x_2\partial x_2} (\bx) & \ldots &
\frac{\partial^2 f}{\partial x_2\partial x_n} (\bx)\\
\vdots & 
\vdots & \ddots &
\vdots\\
\frac{\partial^2 f}{\partial x_n\partial x_1} (\bx) & 
\frac{\partial^2 f}{\partial x_n\partial x_2} (\bx) & \ldots &
\frac{\partial^2 f}{\partial x_n^2} (\bx)
\end{bmatrixfoot}.
$$

We then provide a simple proof of Taylor's expansion   for one-dimensional functions. 
\index{Taylor's expansion}
\begin{theoremHigh}[Taylor's expansion with Lagrange remainder]\label{theorem:taylo_exp}
Let $f(x): \real\rightarrow \real$ be $k$-times continuously differentiable on the closed interval $\sI$ with endpoints $x$ and $y$, for some $k\geq 0$. If $f^{(k+1)}$ exists on the interval $\sI$, then there exists a $\xi \in (x,y)$ such that 
$$
\begin{aligned}
f(x)& = f(y) + f^\prime(y)(x-y) +\ldots + \frac{f^{(k)}(y)}{k!}(x-y)^k
+ \frac{f^{(k+1)}(\xi)}{(k+1)!}(x-y)^{k+1}\\
&=\sum_{i=0}^{k} \frac{f^{(i)}(y)}{i!} (x-y)^i + \frac{f^{(k+1)}(\xi)}{(k+1)!}(x-y)^{k+1}.
\end{aligned}
$$ 
Taylor's expansion can be extended to a function of vector $f(\bx):\real^n\rightarrow \real$ or a function of matrix $f(\bX): \real^{m\times n}\rightarrow \real$.
\end{theoremHigh}
Taylor's expansion, or also known as  \textit{Taylor's series}, approximates the function $f(x)$ around a value  $y$ using a polynomial in a single variable $x$. To understand the origin of this series, we recall from the elementary calculus course that the approximated function  of $\cos (\theta)$ around $\theta=0$ is given by 
$
\cos (\theta) \approx 1-\frac{\theta^2}{2}.
$
This means that $\cos(\theta)$ can be approximated by a second-degree polynomial. 
If we want to approximate $\cos(\theta)$ more generally with a second-degree polynomial $ f(\theta) = c_1+c_2 \theta+ c_3 \theta^2$, an intuitive approach is to match the function and its derivatives at $\theta=0$. That is,
$$\left\{
\begin{aligned}
\cos(0) &= f(0); \\
\cos^\prime(0) &= f^\prime(0);\\
\cos^{\prime\prime}(0) &= f^{\prime\prime}(0);\\
\end{aligned}
\right.
\quad\implies\quad 
\left\{
\begin{aligned}
1 &= c_1; \\
-\sin(0) &=0= c_2;\\
-\cos(0) &=-1= 2c_3.\\
\end{aligned}
\right.
$$
Solving these equations yields $f(\theta) = c_1+c_2 \theta+ c_3 \theta^2 = 1-\frac{\theta^2}{2}$, which matches our initial approximation $\cos (\theta) \approx 1-\frac{\theta^2}{2}$ around  $\theta=0$. 

For high-dimensional functions, we have the following approximation results.
\begin{theoremHigh}[Mean value theorem]\label{theorem:mean_approx}
Let $f(\bx):\sS\rightarrow \real$ be a  continuously differentiable function over an open set $\sS\subseteq\real^n$, and given two points $\bx, \by\in\sS$. Then, there exists a point $\bxi\in[\bx,\by]$ such that 
$$
f(\by) = f(\bx)+ \nabla f(\bxi)^\top (\by-\bx).
$$

\end{theoremHigh}

\begin{theoremHigh}[Linear approximation theorem]\label{theorem:linear_approx}
Let $f(\bx):\sS\rightarrow \real$ be a twice continuously differentiable function over an open set $\sS\subseteq\real^n$, and let $\bx, \by\in\sS$. Then, there exists a point $\bxi\in[\bx,\by]$ such that 
$$
f(\by) = f(\bx)+ \nabla f(\bx)^\top (\by-\bx) + \frac{1}{2} (\by-\bx)^\top \nabla^2 f(\bxi) (\by-\bx),
$$ 
or 
$$
f(\by) = f(\bx)+\nabla f(\bx)^\top (\by-\bx) + o(\normtwo{\by-\bx}),
$$
or
$$
f(\by) = f(\bx)+\nabla f(\bx)^\top (\by-\bx) + \mathcalO(\normtwo{\by-\bx}^2).
$$
\end{theoremHigh}
This theorem suggests that the error in the linear approximation is of the order of the square of the distance between  $\bx$ and $\by$.
\begin{theoremHigh}[Quadratic approximation theorem]\label{theorem:quad_app_theo}
Let $f(\bx):\sS\rightarrow \real$ be a twice continuously differentiable function over an open set $\sS\subseteq\real^n$, and let $\bx, \by\in\sS$. Then it follows that 
$$
f(\by) = f(\bx)+ \nabla f(\bx)^\top (\by-\bx) + \frac{1}{2} (\by-\bx)^\top \nabla^2 f(\bx) (\by-\bx)
+
o(\normtwo{\by-\bx}^2),
$$
or 
$$
f(\by) = f(\bx)+ \nabla f(\bx)^\top (\by-\bx) + \frac{1}{2} (\by-\bx)^\top \nabla^2 f(\bx) (\by-\bx)
+
\mathcalO(\normtwo{\by-\bx}^3).
$$
\end{theoremHigh}
This theorem indicates that the error in the quadratic approximation is of the order of the cube of the distance between $\bx$ and $\by$, making it a more accurate approximation when $\by$ is close to $\bx$.

\section{Statistics and Common Probability Distributions}\label{section:stat_prob}

A random variable is a variable that assumes different values randomly that models uncertain outcomes or events. We denote the random variable itself with a lowercase letter in \textit{normal fonts}, and its possible values with lowercase letters in \textit{italic fonts}.  
For instance, $y_1$ and $y_2$ are  possible values of the random variable $\ry$. 
In the case of vector-valued variables, we represent the random variable as $\rvy$ and one of its realizations as $\by$. 
Similarly, we denote the random variable as $\rmY$ and one of its values as $\bY$ when we are working with matrix-valued variables. 
However, a random variable merely describes possible states and needs to be accompanied by a probability distribution specifying the likelihood of each state.

\paragrapharrow{Probability and statistics.}
In a general sense, one can describe statistics as the mathematical discipline whose purpose is to use empirical data generated by a random phenomenon, in order to make inferences about certain deterministic characteristics of the phenomenon, while simultaneously quantifying the uncertainty inherent in these inferences.

Let's take a step back and examine the key components of this definition. What exactly  is a \textit{random phenomenon}? 
A random phenomenon can be thought of as a system or process whose outcome, denoted $\rx$, is uncertain.  
This means that even if we have complete knowledge of all aspects of the system, we still cannot predict its outcome with certainty.
In mathematical terms, such phenomena are modeled using probability theory: the outcome $\rx$ is represented as a random variable, and the model describing the phenomenon is its \textit{probability distribution function}, also known as the \textit{cumulative distribution function (CDF)}, defined as:
$$
F(x) \equiv \Pr[\rx \leq x].
$$ 
Now, suppose there is a characteristic $\btheta$ of the phenomenon that affects the probabilities associated with the outcomes of $\rx$. Such a characteristic is referred to as a \textit{parameter}. Since the probability of the event $\{\rx \leq x\}$ depends on $\btheta$, the function $F(x)$ must also depend on $\btheta$. Therefore, we write it as:
$$
F(x; \btheta) = {\Pr}_{\btheta}[\rx \leq x].
$$

In probability theory, if we know both the functional form of $F(x; \btheta)$ and the true value of $\btheta$, we can then calculate the probability $\Pr_{\btheta}[\rx \leq x] = F(x; \btheta)$ for any possible outcome $x$. 
However, in statistics, we deal with the inverse problem: suppose that we know the precise functional form of $F(x; \btheta)$, but do not know which is the true $\btheta$. If we have an outcome $x$ (a realization of $\rx$), the central question becomes: is it possible to say something useful about $\btheta$? It seems that we should be able to do so. Since $\btheta$ influences what outcomes are most probable, then knowing an outcome should give us information on which $\btheta$ are plausible. The topic of statistics will be how exactly to make this connection rigorous and show how to exploit it in order to (a) make the best possible use of our data $x$ to better inform ourselves about $\btheta$ and (b) understand how certain we can be about our inferences on $\btheta$ for the given data $x$. In summary, the discussion of statistics includes:
\begin{enumerate}
\item There is a distribution $F(x; \btheta)$ depending on an unknown $\btheta \in \real^p$.
\item We observe the realization of $n$ independent identically distributed random variables $\rx_1, \rx_2,\ldots, \rx_n$ that follow this distribution.
\item We wish to use our $n$ observations (the realizations of $\rx_1, \rx_2, \ldots, \rx_n$) in order to make statements about the true value of $\btheta$ and to quantify the uncertainty associated with those statements.
\end{enumerate}

\paragrapharrow{Discrete  random variables.}
Random variables can be either discrete or continuous. A discrete random variable has a finite or countably infinite number of states. These states need not be integers; they can also be named states without numerical values. Conversely, a continuous random variable is associated with real values.

A probability distribution describes the likelihood of a random variable or set of random variables. 
The characterization of probability distributions varies depending on whether the variables are discrete or continuous.

For discrete variables, we employ a \textit{probability mass function (p.m.f., PMF)}. Probability mass functions are denoted with a capital $\Pr$.
The probability mass function maps  a state of a random variable to
the probability of that random variable taking on that state. The notation $\Pr(\ry = y)$ (or $f_{\ry}(y)$, $\Pr_{\ry}(y)$) represents the probability that $\ry$ equals $y$, with a probability of 1 indicating certainty and a probability of 0 indicating impossibility. 
Alternatively, we define a variable first and use the $\sim$ notation to specify its distribution later: $\ry \sim \Pr(\ry)$.

Probability mass functions can operate on multiple variables simultaneously, constituting a \textit{joint probability mass distribution functions} or \textit{joint frequency functions}. 
For example, $\Pr(\rx = x, \ry = y)$ denotes the probability of $x = x$ and $y = y$ simultaneously, and we may also use the shorthand $\Pr(x, y)$ or $\Pr_{\rx, \ry}(x,y)$. Moreover, if the PMF depends on some known parameters $\btheta$, then it can be denoted by $\Pr(x, y \mid \btheta)$  (or $\Pr_{\btheta}(x, y)$, $f(x, y; \btheta)$ ) for brevity.

In many cases, our focus lies in determining the probability of an event, given the occurrence of another event. This is referred to as a \textit{conditional probability}. 
The conditional probability that $\rx = x$ given $\ry = y$ is denoted by  $\Pr(\rx = x \mid  \ry = y)$. 
This can be calculated using the formula
$$
\Pr(\rx = x \mid  \ry = y) = \frac{\Pr(\rx = x, \ry = y)}{\Pr(\ry = y)}.
$$
This formula serves as the cornerstone in \textit{Bayes' theorem} (see Theorem~\ref{theorem:baye_theo_mle}).

On the contrary, there are instances when the probability distribution across a set of variables is known, and the interest lies in determining the probability distribution over a specific subset of them.  
The probability distribution over the subset is referred to as the \textit{marginal probability mass distribution}.
For example, suppose we have discrete random variables $\rx$ and $\ry$, and we know $\Pr(\rx, \ry)$. We can find $\Pr(\rx)$ using summation:
$$
\Pr(\rx=x) = \sum_{y} \Pr(\rx=x, \ry=y).
$$

\paragrapharrow{Continuous random variables.}
When dealing with continuous random variables, we represent  probability distributions using a \textit{probability density function (p.d.f., PDF)} instead of  a probability mass function. For a function $p$ to qualify as a probability density function, it must adhere to the following properties:
\begin{itemize}
	\item The domain of $p$ must be the set of all possible states of $\ry$;
	\item We do not require $p(y) \leq  1$ as that in the PMF. However, it must satisfies that $\forall\, y\in\ry, p(y)\geq 0$.
	\item Integrates to 1: $\int p(y)dy = 1$.
\end{itemize}
A probability density function $p(y)$ (or denoted as $f_{\ry}(y)$, $p_{\ry}(y)$) does not provide the probability of a specific state directly. 
Instead, the probability of falling within an infinitesimal region with volume $\delta y$ is given by $p(y)\delta y$.
Moreover, if the probability density function depends on some known parameters $\btheta$, it can be denoted by $p(x\mid \btheta)$, $f_{\rx}(x; \btheta)$ or $f(x; \btheta)$ for brevity.


\index{Variance}
\paragrapharrow{Distribution function.}
On the other hand, a probability distribution can be characterized by the cumulative distribution function (c.d.f., CDF).
Let $\rx$ be a random variable with the  distribution function $F(x)=\Pr[\rx\leq x]$, where $F(x)$ is nondecreasing and right continuous and satisfies
$$
0 \leq F(x) \leq 1, \qquad F(-\infty) = 0, \qquad F(\infty) = 1.
$$
The expected value $\mu$ and variance $\omega^2$ of $\rx$ in the continuous case are defined as
$$
\mu = \Exp[\rx] = \int_{-\infty}^{\infty} x dF(x), \qquad \omega^2=\Var[\rx] = \Exp[(\rx - \mu)^2] = \int_{-\infty}^{\infty} (x - \mu)^2 dF(x).
$$
Similarly, in the discrete case, they are defined as 
$$
\mu = \Exp[\rx] = \sum_{x} x \Pr(x), \qquad \omega^2 =\Var[\rx]= \Exp[(\rx - \mu)^2] = \sum_{x} (x - \mu)^2 \Pr(x).
$$
It is easy to see that 
\begin{equation}
\Var[\rx] = \Exp[\rx^2] - (\Exp[\rx])^2.
\end{equation}
\begin{exercise}[Uniform distribution]\label{exercise:uniform_dist}
Let $\rx\sim \uniformdist(x\mid a,b)$ be a uniform distributed variable such that $f_{\rx}(x) = \frac{1}{b-a}$ if $a\leq x\leq b$ and $f_{\rx}(x) =0$ otherwise. Show that
$$
\Exp[\rx] = \frac{a+b}{2}
\qquad \text{and}\qquad 
\Var[\rx] =\frac{(b-a)^2}{12}.
$$
\end{exercise}

Let $\rvx = [\rx_1, \rx_2 \ldots, \rx_n]^\top$ be a vector of random variables. The \textit{joint probability  distribution function} (or simply probability distribution function) of the random vector---denoted as $F_{\rvx}(\bx) $, $F(x_1, x_2, \ldots, x_n)$, or $F(\bx)$---is defined as 
$$
 F_{\rvx}(x_1, x_2, \ldots, x_n) = \Pr[\rx_1\leq x_1, \rx_2, \leq x_2, \ldots, \rx_n\leq x_n].
$$
It holds that 
\begin{itemize}
\item When all the variables are discrete, as discussed above, the joint probability mass function or the joint frequency function can be characterized as 
$$
f_{\rvx}(x_1, x_2, \ldots, x_n) = \Pr[\rx_1=x_1, \rx_2=x_2, \ldots, \rx_n=x_n].
$$
\item On the contrary, if the variables are continuous, the \textit{joint probability density function} is a function $f_{\rvx} :\real^n\rightarrow [0, \infty)$ (or simply $f(\bx)$, $p_{\rvx}(\bx)$, $p(\bx)$) such that 
$$
F_{\rvx}(x_1, x_2, \ldots, x_n) = \int_{-\infty}^{x_1}\ldots \int_{-\infty}^{x_n} f_{\rvx}(y_1, y_2, \ldots, y_n)d y_1 \ldots dy_n.
$$
In this case, when $f_{\rvx}$ is continuous at $\bx=[x_1,x_2, \ldots,x_n]^\top$, we have 
$$
f_{\rvx}(x_1,x_2, \ldots,x_n) =\frac{\partial^n}{\partial x_1 \partial x_2\ldots \partial x_n} F_{\rvx}(x_1,x_2, \ldots,x_n).
$$
The variables are independent if and only if 
$$
\begin{aligned}
F_{\rvx}(x_1,x_2, \ldots,x_n) &= F_{\rx_1}(x_1) \cdot F_{\rx_2}(x_2)\cdot\ldots \cdot F_{\rx_n}(x_n);\\
\text{or} \quad f_{\rvx}(x_1,x_2, \ldots,x_n) &= f_{\rx_1}(x_1) \cdot f_{\rx_2}(x_2)\cdot\ldots \cdot f_{\rx_n}(x_n).
\end{aligned}
$$
The \textit{conditional or marginal probability density functions} for the continuous cases are similar to the discrete cases, except that integration is employed instead of summation. For example, $f_{\rx}(x) = \int f_{\rx,\ry}(x, y)dy$.
\end{itemize}

\index{Covariance}
\index{Correlation}
Specifically, the joint distribution of $\rx_i$ and $\rx_j$ is $F(x_i, x_j) \equiv \Pr[\rx_i\leq x_i, \rx_j\leq x_j]$. Then the covariance $\omega_{ij}$ between $\rx_i$ and $\rx_j$ is defined by
$$
\omega_{ij} \triangleq \Cov[\rx_i, \rx_j] \triangleq \Exp[(\rx_i - \mu_i)(\rx_j - \mu_j)] = \int_{x_i, x_j = -\infty}^{\infty} (x_i - \mu_i)(x_j - \mu_j) dF(x_i, x_j).
$$
Then $\omega_{ij} = \Exp[\rx_i \rx_j] - \mu_i \mu_j$, where $\mu_i = \Exp[\rx_i]$. The covariance matrix $\bOmega \in \real^{n \times n}$ of the random vector $\rvx$ is defined by
\begin{equation}
\Cov[\rvx] \triangleq \bOmega = \Exp[(\rvx - \bmu)(\rvx - \bmu)^\top] = \Exp(\rvx\rvx^\top) - \bmu \bmu^\top,
\end{equation}
where $\bmu = \Exp[\rvx] = [\mu_1, \mu_2, \ldots, \mu_n]^\top$. A few useful properties are provided below.

The variance of two random variables expresses the degree of linear dependency between the two. Moreover, the correlation between the two is defined as 
\begin{equation}
\Corr[\rx_1, \rx_2] \triangleq \frac{\Cov[\rx_1, \rx_2]}{\sqrt{\Var[\rx_1]\Var[\rx_2]}}.
\end{equation}
The variance and correlation of two random variable convey the equivalent dependence information. 
However, the correlation is invariant to changes of scale, and the range of the correlation is $[-1,1]$.
It also holds that 
\begin{equation}
\abs{\Corr[\rx_1, \rx_2]} \leq \sqrt{\Var[\rx_1]\Var[\rx_2]}.
\end{equation}

\begin{lemma}[Linear transformation]\label{lemma:lin_tran_prob}
Let $\rvy = \bA\rvx$, where $\bA \in \real^{m \times n}$ is a given matrix, and let $\rvx \in \real^n$ be a random vector with the mean vector $\Exp[\rvx] = \bmu$ and covariance matrix $\bOmega$. Then
$$
\Exp[\rvy] = \bA\bmu 
\qquad\text{and}\qquad  \Cov[\rvy] = \bA\bOmega\bA^\top.
$$
\end{lemma}
\begin{proof}[of Lemma~\ref{lemma:lin_tran_prob}]
The first property follows directly from the definition of expected value. The second is proved as
$$
\Cov[\bA\rvx] = \Exp[\bA(\rvx - \bmu)(\rvx - \bmu)^\top \bA^\top ]= \bA \Exp[(\rvx - \bmu)(\rvx - \bmu)^\top] \bA^\top = \bA\bOmega\bA^\top.  
$$
This completes the proof.
\end{proof}
In the special case when $\bA=\ba^\top$ is a row vector, $\rvy = \ba^\top\rvx$ is a linear functional of $\rvx$. Then, if $\Cov[\rvx] = \sigma^2 \bI$, $\Cov[\rvy] = \sigma^2 \ba^\top \ba$. 
\begin{lemma}[Quadratic transformation]\label{lemma:quad_tra_prob}
Let $\bA \in \real^{n \times n}$ be a symmetric matrix, and let $\rvx\in\real^n$ be a random vector with expected value $\bmu=[\mu_i]$ and covariance matrix $\bOmega=[\omega_{ij}]$. Then,
$$
\Exp[\rvx^\top\bA\rvx] = \bmu^\top \bA \bmu + \trace(\bA\bOmega),
$$
where $\trace(\bA\bOmega)$ denotes the trace of $\bA\bOmega$, i.e., sum of diagonal elements of the matrix.
\end{lemma}
\begin{proof}[of Lemma~\ref{lemma:quad_tra_prob}]
Since $\rvx^\top \bA \rvx = \sum_{i=1}^n \sum_{j=1}^n a_{ij} \rx_i \rx_j$, 
it follows that  $\Exp[\rvx^\top \bA \rvx] = \sum_{i=1}^n \sum_{j=1}^n a_{ij} \Exp[\rx_i \rx_j]$.
Substitute the expectations $ \Exp[\rx_i \rx_j] = \mu_i \mu_j + \omega_{ij} $:
$$
\Exp[\rvx^\top \bA \rvx] = \underbrace{\sum_{i=1}^n \sum_{j=1}^n a_{ij} \mu_i \mu_j}_{\bmu^\top \bA \bmu} + \underbrace{\sum_{i=1}^n \sum_{j=1}^n a_{ij} \omega_{ij}}_{\trace(\bA\bOmega)}.
$$
Thus, the lemma is proven.

Alternatively, for $\rvx^\top \bA\rvx$, we have
$$
\begin{aligned}
	\rvx^\top \bA\rvx &= (\bmu + \rvx - \bmu )^\top \bA(\bmu + \rvx - \bmu) \\ 
	&=\bmu^\top \bA\bmu +\bmu^\top \bA(\rvx-\bmu) + (\rvx-\bmu)^\top \bA\bmu + (\rvx-\bmu)^\top\bA(\rvx-\bmu).
\end{aligned}
$$
Thus, it follows that 
$$
\begin{aligned}
\Exp[\rvx^\top\bA\rvx] 
&= \Exp[\bmu^\top \bA\bmu +\bmu^\top \bA(\rvx-\bmu)  + (\rvx-\bmu)^\top \bA\bmu + (\rvx-\bmu)^\top\bA(\rvx-\bmu)] \\
&=\bmu^\top \bA\bmu + \Exp[(\rvx-\bmu)^\top\bA(\rvx-\bmu)] 
= \bmu^\top \bA\bmu + \Exp\left[\trace[(\rvx-\bmu)^\top\bA(\rvx-\bmu)]\right]  \\
&\stackrel{\dag}{=}\bmu^\top \bA\bmu + \Exp\left[\trace[\bA(\rvx-\bmu)(\rvx-\bmu)^\top]\right]
\stackrel{\ddag}{=}\bmu^\top \bA\bmu + \trace\left[\Exp[\bA(\rvx-\bmu)(\rvx-\bmu)^\top]\right] \\
&=\bmu^\top \bA\bmu + \trace\left[\bA\Exp[(\rvx-\bmu)(\rvx-\bmu)^\top]\right]
=\bmu^\top \bA\bmu + \trace\left[\bA\bOmega\right],
\end{aligned}
$$
where the equality ($\dag$) follows from the trace trick, and the equality ($\ddag$) follows from the linear property of traces.
For any matrix $\bA, \bB, \bC$, the trace trick is
\begin{equation}
	\trace(\bA\bB\bC) = \trace(\bB\bC\bA) = \trace(\bC\bA\bB), \nonumber
\end{equation}
if all $\bA\bB\bC$, $\bB\bC\bA$, and $\bC\bA\bB$ exist. 
This completes the proof.
\end{proof}

We will often engineer this  $\bmu=\bzero$ so that the key thing we care about is just $\trace(\bA\bOmega)$.
If we assume that $\bmu=\bzero$ and the variance is $\sigma^2\bSigma$, then $\Exp[\rvx^\top\bA\rvx ] = \sigma^2\trace(\bA\bSigma)$.
Note that $\bSigma$ is a known value. Then, we search the universe to find an $\bA$ such that $\trace(\bA\bSigma) = 1$, which would give us an unbiased estimate of $\sigma^2$; see Definition~\ref{defintion:biased_unbiased} for a reference.

\index{Trace trick}
\index{Quadratic expectation}

\subsection{Common Univariate Probability Distributions}
In the rest of this section,  we provide rigorous definitions for common probability distributions.
\index{Gaussian distribution}
\begin{definition}[Gaussian or normal distribution]\label{definition:gaussian_distribution}
A random variable $\rx$ is said to follow the \textit{Gaussian distribution} (a.k.a., the \textit{normal distribution}) with mean and variance parameters $\mu$ and $\sigma^2>0$, denoted by $\rx \sim \normal(\mu,\sigma^2)$ \footnote{Note if two random variables $\ra$ and $\rb$ have the same distribution, then we write $\ra \sim \rb$.}, if 
$$
f(x; \mu,\sigma^2)=\frac{1}{\sqrt{2\pi\sigma^2}} \exp \left\{-\frac{1}{2\sigma^2 }(x-\mu)^2 \right\}
=\sqrt{\frac{\tau}{2\pi}}\exp \left\{ -\frac{\tau}{2}(x-\mu)^2 \right\} .
$$
The mean and variance of $\rx \sim \normal( \mu,\sigma^2)$ are given by 
$$
\Exp[\rx] = \mu, \qquad \Var[\rx] =\sigma^2=\tau^{-1}, 
$$
where $\tau$ is also known as the \textit{precision} of the Gaussian distribution.
The cumulative distribution function (c.d.f., CDF) of Gaussian is 
$$
F(x; \mu, \sigma^2) = \Pr(\rx<x) = \frac{1}{\sqrt{2\pi\sigma^2}} \int_{-\infty}^x \exp\left\{-\frac{1}{2\sigma^2 }(z-\mu)^2 \right\}dz.
$$
Specifically, we denote $\Phi(y) = \int_{-\infty}^{y} \normal(u\mid 0,1)du= \frac{1}{\sqrt{2\pi}} \int_{-\infty}^{y} \exp(-\frac{u^2}{2}) du $ as the cumulative distribution function of $\normal(0,1)$, the standard normal distribution. 
Figure~\ref{fig:dists_gaussian} illustrates the impact of different parameters $\mu, \sigma^2$ for the Gaussian distribution.
\end{definition}
Suppose $\mathcalX=\{x_1, x_2, ..., x_N\}$ are drawn independent, identically distributed (i.i.d.) from a Gaussian distribution of $\normal(x\mid \mu, \sigma^2)$.
For analytical simplicity, we may rewrite the Gaussian probability density function as follows:
\begin{equation}\label{equation:uni_gaussian_likelihood}
\begin{aligned}
p(\mathcal{X} \mid \smu, \ssigma^2) &= \prod^N_{i=1} \mathcal{N} (x_i\mid\smu, \ssigma^2) \\
&= (2\pi)^{-N/2}  (\ssigma^2)^{-N/2} \exp\left\{-\frac{1}{2 \ssigma^2}  \left[ N(\overline{x} - \smu)^2 + N \sum_{n=1}^N(x_n - \overline{x})^2   \right] \right\}  \\
&= (2\pi)^{-N/2}  (\ssigma^2)^{-N/2} \exp\left\{-\frac{1}{2 \ssigma^2}  \left[  N(\overline{x} - \smu)^2 +  N S_{\overline{x}} \right] \right\},
\end{aligned}
\end{equation}
where $S_{\overline{x}}\triangleq \sum_{n=1}^N(x_n - \overline{x})^2$ and $\overline{x} \triangleq \frac{1}{N} \sum_{i=1}^{N}x_i$.

\begin{SCfigure}
\centering
\includegraphics[width=0.5\textwidth]{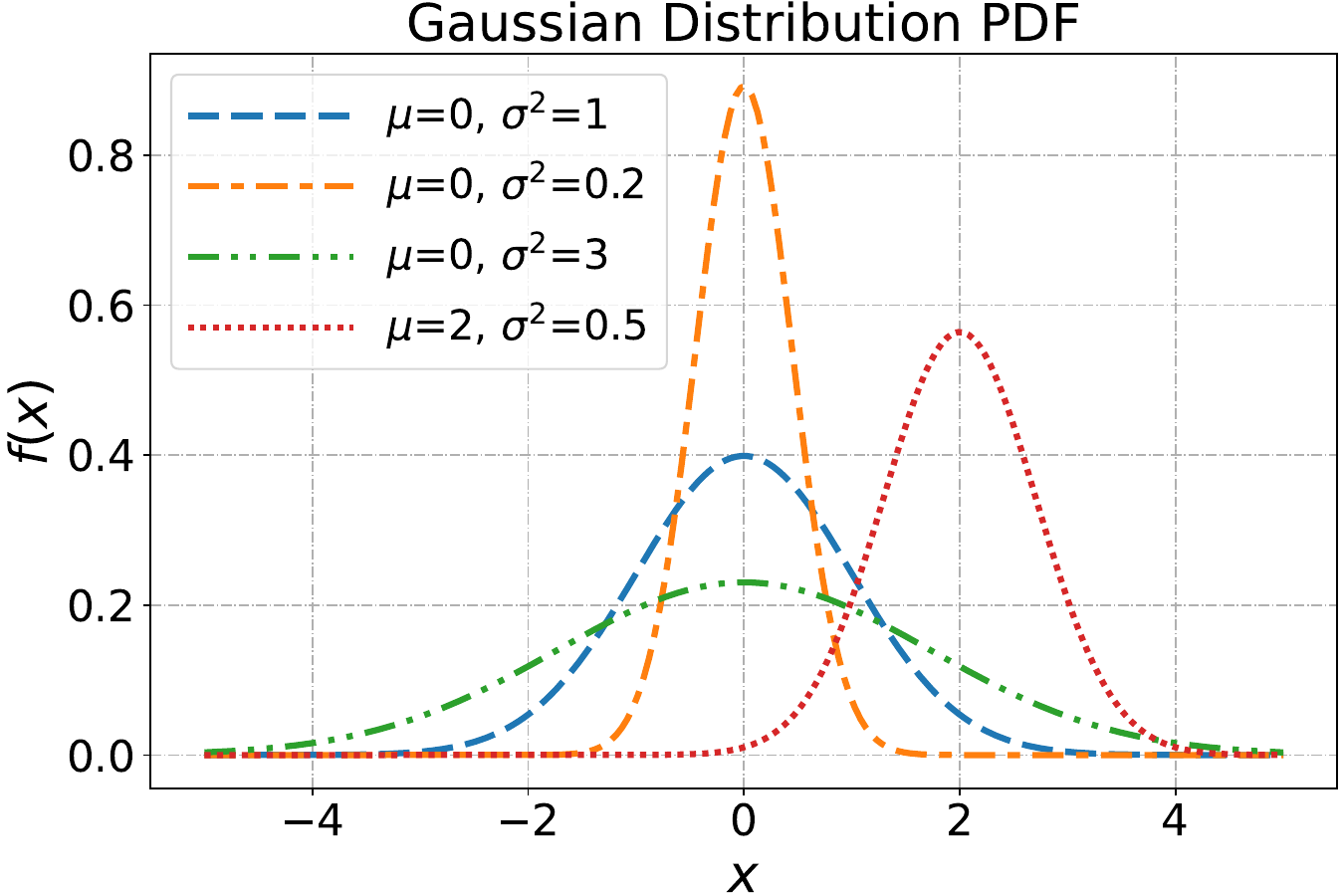}
\caption{Gaussian probability density functions for different values of the mean and variance parameters $\mu$ and $\sigma^2$.}
\label{fig:dists_gaussian}
\end{SCfigure}

While the product of two Gaussian variables remains an open problem, the sum of Gaussian variables can be characterized by a new Gaussian distribution.
\begin{remark}[Sum of Gaussians]
Let $\rx$ and $\ry$ be two Gaussian distributed variables with means $\mu_x, \mu_y$ and variance $\sigma_x^2, \sigma_y^2$, respectively.
\begin{itemize}
\item When there is no correlation between the two variables, then it follows that
$$
\rx+\ry \sim \normal(\mu_x+\mu_y, \sigma_x^2+\sigma_y^2).
$$
\item When there exists a correlation of $\rho$ between the two variables, then  it follows that
$$
\rx+\ry \sim \normal(\mu_x+\mu_y, \sigma_x^2+\sigma_y^2+2\rho\sigma_x\sigma_y).
$$
\end{itemize}
\end{remark}

Gaussian distributions have strong concentration properties. The  following tail bound on a Gaussian random variable is an important result.
\begin{exercise}[Chernoff bound for centered Gaussian]
For $\rx \sim \normal(0, \sigma^2)$, show that
$$
p(\abs{\rx} \geq t) \leq 2 e^{-\frac{t^2}{2\sigma^2}}.
$$
\textit{Hint: Use Chernoff bound.}
\end{exercise}

The \textit{Laplace} distribution, also known as  the \textit{double exponential} distribution, is named after \textit{Pierre-Simon Laplace} (1749--1827), who obtained the distribution in 1774 \citep{kotz2001laplace, hardle2007applied}.
This distribution finds applications in modeling heavy-tailed data due to its tails being heavier than those of the normal distribution, and it is used extensively in sparse-favoring models since it expresses a high peak with heavy tails (same as the $\ell_1$ regularization term in non-probabilistic or non-Bayesian optimization methods). 
In Bayesian modeling, when there is a prior belief that the parameter of interest is likely to be close to the mean with the potential for large deviations, the Laplace distribution serves as a suitable prior distribution for such scenarios.
\index{Laplace distribution}
\begin{definition}[Laplace distribution]\label{definition:laplace_distribution}
A random variable $\rx$ is said to follow the \textit{Laplace distribution} with location and scale parameters $\mu$ and $b>0$, respectively, denoted by $\rx \sim \laplacedist(\mu,b)$, if 
$$
f(x; \mu,b)=\frac{1}{2b} \exp \left( -\frac{\abs{x-\mu}}{b } \right).
$$
The mean and variance of $\rx \sim \laplacedist( \mu,b)$ are given by 
\begin{equation}
\Exp[\rx] = \mu, \qquad \Var[\rx] =2b^2. \nonumber
\end{equation}
Figure~\ref{fig:dists_laplace} compares different parameters $\mu$ and $b$ for the Laplace distribution.
\end{definition}

\begin{SCfigure}
\centering
\includegraphics[width=0.5\textwidth]{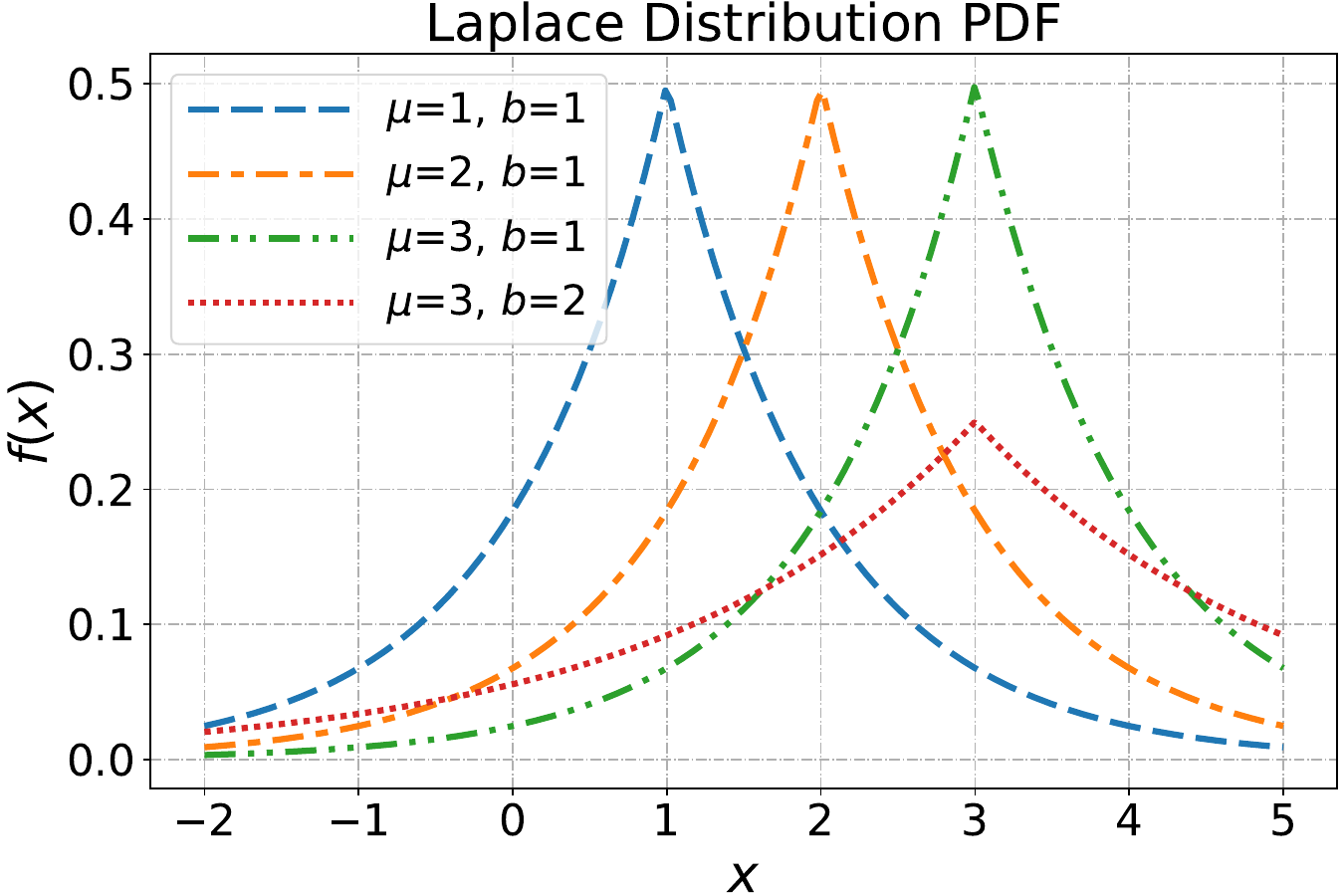}
\caption{Laplace  probability density functions for different values of the parameters.}
\label{fig:dists_laplace}
\end{SCfigure}

\index{Binomial distribution}
More often than not, we repeat an experiment multiple times independently with two alternative outcomes, say ``success" and ``failure", and we want to model the overall number of successes. 
We are inevitably taken to the binomial distribution if each experiment is modeled as a Bernoulli distribution. 
This simulates the overall proportion of heads in a run of $n$ separate coin flips.
\begin{definition}[Binomial distribution]\label{definition:binomial_distri}\index{Binomial distribution}
A random variable $\rx$ is said to follow the \textit{binomial distribution} with parameter $p\in (0,1)$ and $n \in \mathbb{N}$, denoted by $\rx \sim \binoimial(n,p)$, if 
$$ 
f(x; n, p)=\binom{n}{x} p^x(1-p)^{n-x},
$$
where $\binom{n}{x}$ is known as the \textit{binomial coefficient}.
The mean and variance of $\rx \sim \binoimial(a, b)$ are given by 
\begin{equation}
\Exp[\rx] = np, \qquad \Var[\rx] = np(1-p). \nonumber
\end{equation}
Figure~\ref{fig:dists_binimial} compares different parameters of $p$ with $n=10$ for the binomial distribution.
\end{definition}

A distribution that is closely related to the Binomial distribution is called the Bernoulli distribution.
A random variable is said to follow the \textit{Bernoulli} distribution with parameter $p\in(0,1)$, denoted as $\rx\sim\bernoullidist(p)$, if  
\begin{equation}\label{equation:bbern_dist}
f(x; p) =p\indicator\{x=1\} + (1-p)\indicator\{x=0\},
\end{equation}
with mean $\Exp[\rx]=p$ and variance $\Var[\rx]=p(1-p)$, respectively.

\begin{exercise}[Bernoulli and binomial]\label{exercise:bern_binom}
Show that if $\rx=\sum_{i=1}^{N} \ry_i$ with $\ry_i\stackrel{i.i.d.}{\sim} \bernoullidist(p)$, then we have $\rx\sim \binoimial(N, p)$.
\end{exercise}

\begin{exercise}[Scaled binomial]\label{exercise:scaled_binom}
When considering proportions instead of counts, we scale the binomial variable $\rx$ by dividing it by the number of trials $n$. Define $\ry$ as the random variable representing the proportion of successes:
$$
\ry = \frac{\rx}{n}
$$
Then, $\ry$ can be seen as having a ``scaled" or ``rescaled'' binomial distribution, with the PMF adjusted accordingly:
$$
\Pr(\ry = y) = \binom{n}{ny} p^{ny} (1-p)^{n(1-y)}
$$
for values of $y$ that are multiples of $1/n$, i.e., $z = 0, \frac{1}{n}, \frac{2}{n}, \dots, 1$.
Show that 
$$
\Exp[\ry] = p \qquad \text{and} \qquad \Var[\ry] = \frac{p(1-p)}{n}.
$$
\end{exercise}
The scaled binomial distribution is  particularly useful when modeling data that represent proportions or rates, such as the proportion of individuals responding to a treatment in clinical trials or the success rate of an event over multiple attempts.
In statistical modeling, especially within the framework of generalized linear models (GLMs), using the scaled binomial allows for direct modeling of probabilities or proportions while accounting for the number of trials. This approach simplifies the interpretation of model coefficients and predictions, focusing on the expected proportion of successes rather than the raw count; see Example~\ref{example:grouped_logis}.

\begin{SCfigure}
\centering
\includegraphics[width=0.5\textwidth]{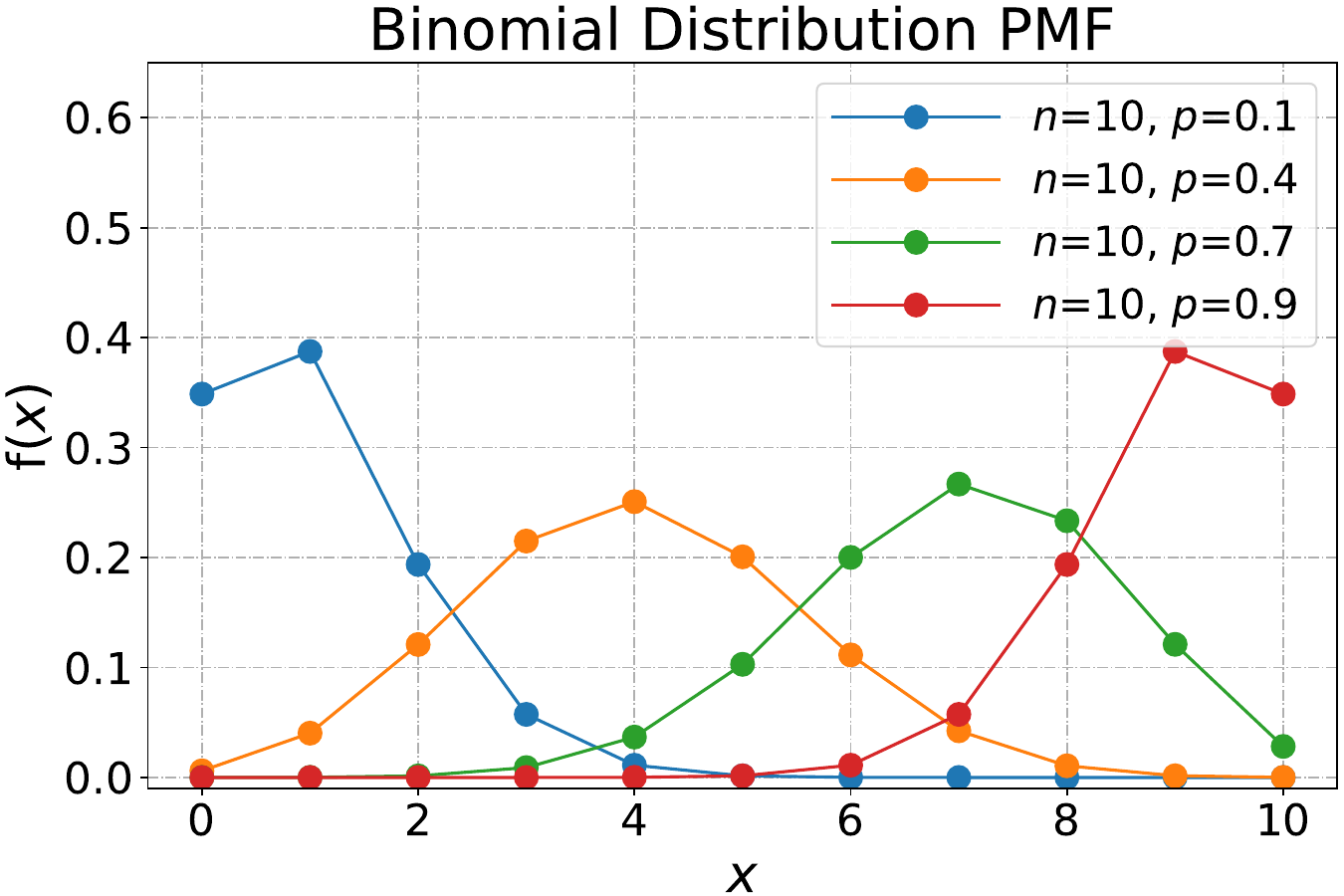}
\caption{Binomial distribution probability mass
functions for different values of the parameters $p$ with $n=10$.}
\label{fig:dists_binimial}
\end{SCfigure}


\index{Exponential distribution}
The \textit{exponential} distribution is a  probability distribution commonly used in modeling events occurring  randomly over time, such as  the time elapsed until the occurrence of a certain event, or the time between two consecutive events.
It is a special Gamma distribution (see definition below) with support on nonnegative real values.
\begin{definition}[Exponential distribution]\label{definition:exponential_distribution}
A random variable $\rx$ is said to follow the \textit{exponential distribution} with rate parameter $\lambda>0$ \footnote{Note the inverse rate parameter $1/\lambda$ is called the scale parameter.
In probability theory and statistics, 
the \textit{location} parameter shifts the entire distribution left or right, e.g., the mean parameter of a Gaussian distribution; 
the \textit{shape} parameter compresses or stretches the entire distribution;
the \textit{scale} parameter changes the shape of the distribution in some manner.
}, denoted by $\rx \sim \exponential(\lambda)$, if 
$$ f(x; \lambda)=\left\{
\begin{aligned}
& \lambda \exp(-\lambda x)
,& \mathrm{\,\,if\,\,} x \geq 0;  \\
&0 , &\mathrm{\,\,if\,\,} x <0.
\end{aligned}
\right.
$$
We will see this is equivalent to $\rx\sim \gammadist(1, \lambda)$, a Gamma distribution.
The mean and variance of $\rx \sim \exponential(\lambda)$ are given by 
\begin{equation}
\Exp[\rx] = \lambda^{-1}, \qquad \Var[\rx] =\lambda^{-2}. \nonumber
\end{equation}
The support of an exponential distribution is on $(0,\infty)$.
Figure~\ref{fig:dists_exponential} compares different parameters $\lambda$ for the exponential distribution.
\end{definition}

Note that the average $\lambda^{-1}$ is the average time until the occurrence of the event of interest, interpreting $\lambda$   as a rate parameter. 
An important property of the exponential distribution is that it is ``memoryless," meaning that  the probability of waiting for an additional amount of time $x$  depends only on $x$, not on the past waiting time.
\begin{remark}[Property of exponential distribution]
Let $\rx\sim \exponential(\lambda)$. Then we have $\Pr(\rx\geq x + s \mid \rx \geq s) = \Pr(\rx\geq x)$.
\end{remark}
\begin{SCfigure}
\centering
\includegraphics[width=0.5\textwidth]{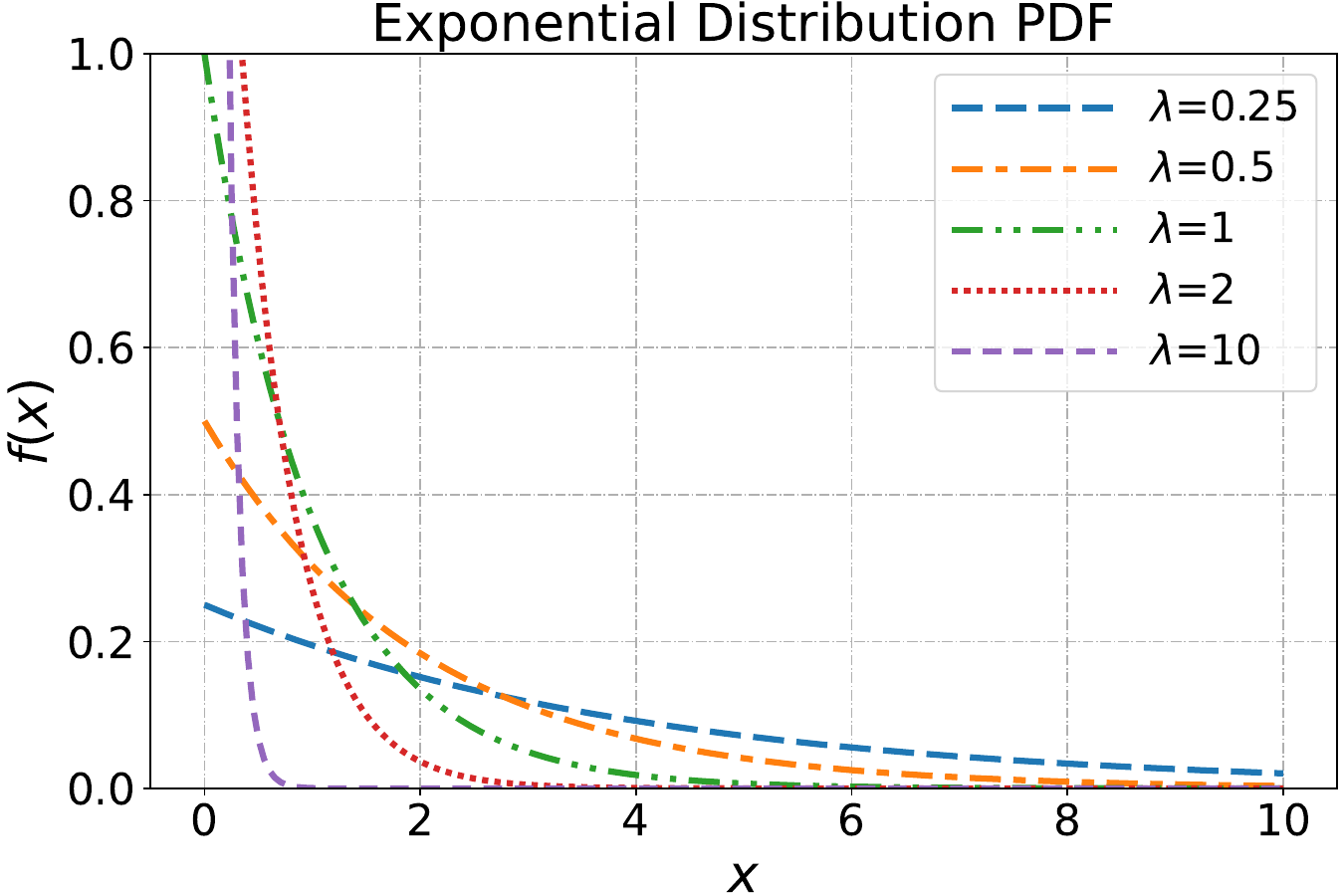}
\caption{Exponential probability density functions for different values of the rate parameter $\lambda$.}
\label{fig:dists_exponential}
\end{SCfigure}

\begin{definition}[Gamma distribution]\label{definition:gamma_distri}\index{Gamma distribution}
A random variable $\rx$ is said to follow the \textit{Gamma distribution} with shape parameter $r>0$ and rate parameter $\lambda>0$, denoted by $\rx \sim \gammadist(r, \lambda)$, if 

$$ f(x; r, \lambda)=\left\{
\begin{aligned}
&\frac{\lambda^r}{\Gamma(r)} x^{r-1} \exp(-\lambda x) ,& \mathrm{\,\,if\,\,} x \geq 0.  \\
&0 , &\mathrm{\,\,if\,\,} x <0,
\end{aligned}
\right.
$$
where $\Gamma(x)=\int_{0}^{\infty} t^{x-1}\exp(-t)dt$ is the \textit{Gamma function},  and we can just take it as a function to normalize the distribution into sum to 1. In special cases when $y$ is a positive integer, $\Gamma(y) = (y-1)!$.
We will delay the introduction of Chi-squared distribution in Definition~\ref{definition:chisquare_dist}. 
The mean and variance of $\rx \sim \gammadist(r, \lambda)$ are given by 
\begin{equation}
\Exp[\rx] = \frac{r}{\lambda}, \qquad \Var[\rx] = \frac{r}{\lambda^2}. \nonumber
\end{equation}
Figure~\ref{fig:gamma_chisquare_compare} compares different parameters for the Gamma distribution.
\end{definition}

It is important to note that the definition of the Gamma distribution does not constrain $r$ to be a natural number; instead, it allows  $r$ to take any positive value.
However, when $r$ is a positive integer, the Gamma distribution can be interpreted as a sum of $r$ exponentials of rate $\lambda$ (see Definition~\ref{definition:exponential_distribution}).
The summation property holds true more generally for Gamma variables with the same rate parameter. If $\rx_1$ and $\rx_2$ are random variables drawn from $\gammadist(r_1, \lambda)$ and $\gammadist(r_2, \lambda)$, respectively, then their sum $\rx_1+\rx_2$ is a Gamma random variable from $\gammadist(r_1+r_2, \lambda)$.

\index{Integration by parts}
In the Gamma distribution definition, we observe that the Gamma function can be defined as follows:
$$
\Gamma(y) = \int_{0}^{\infty} x^{y-1} e^{-x} dx,  \qquad y\geq 0.
$$
Utilizing integration by parts $\int_{a}^{b} u(x) v^\prime(x) dx = u(x)v(x)|_a^b - \int_a^b u^\prime(x) v(x)dx$, where $u(x) =x^{y-1}$ and $v(x)=-e^{-x}$, we derive 
$$
\begin{aligned}
\Gamma(y) &= -x^{y-1}e^{-x}|_0^{\infty} - \int_{0}^{\infty} (y-1)x^{y-2}(-e^{-x}) dx \\
&= 0+ (y-1) \int_{0}^{\infty}x^{y-2}e^{-x}dx 
= (y-1)\Gamma(y-1).
\end{aligned}
$$ 
This demonstrates  that when $y$ is a positive integer, the relationship $\Gamma(y) = (y-1)!$ holds true.

\begin{figure}[h]
\centering
\vspace{-0.35cm}
\subfigtopskip=2pt
\subfigbottomskip=2pt
\subfigcapskip=-5pt
\subfigure[Gamma probability density
functions for different values of the parameters $r$ and $\lambda$.]{\label{fig:dists_gamma}
\includegraphics[width=0.47\linewidth]{./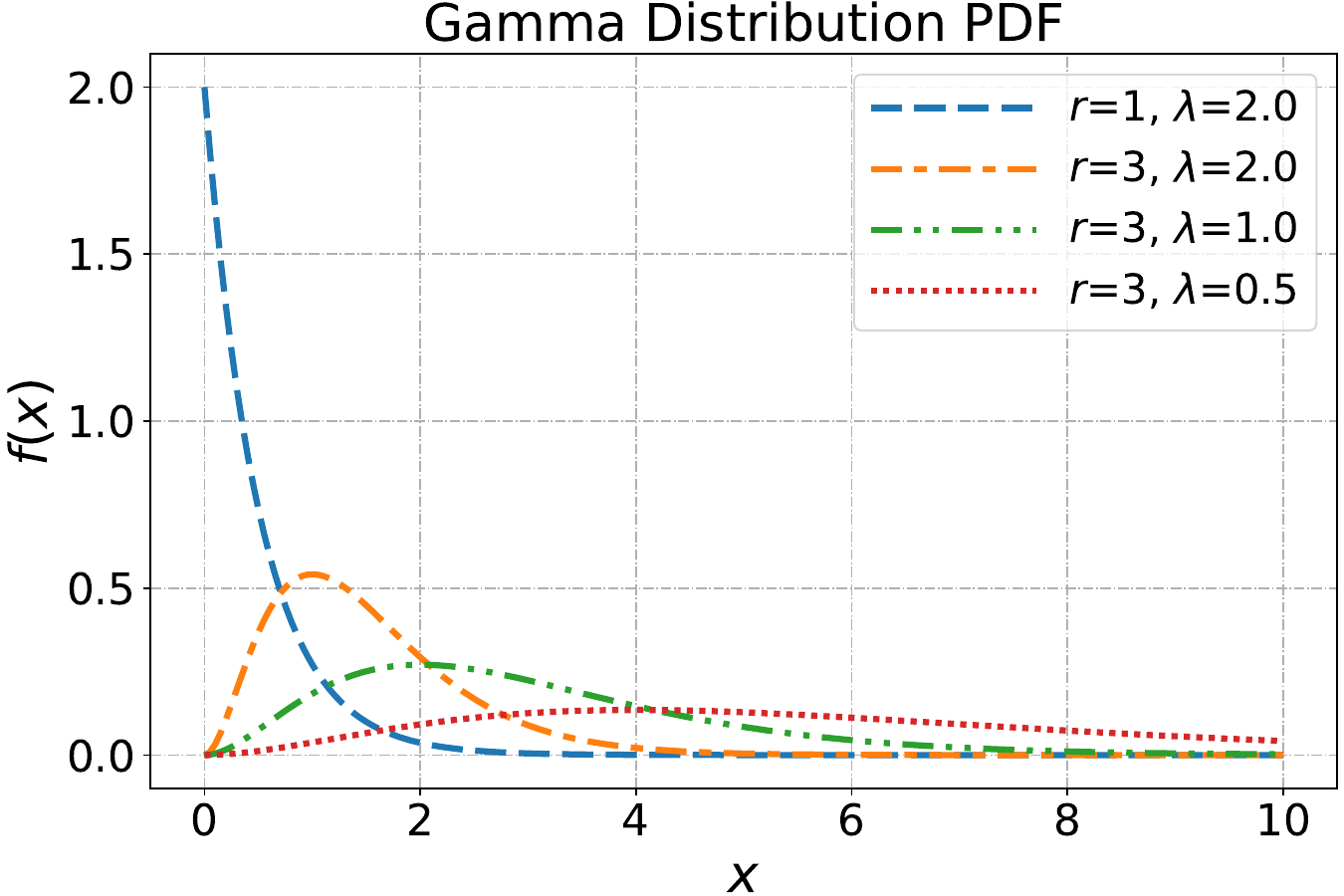}}
\quad 
\subfigure[Chi-squared probability density
functions for different values of the parameter $p$.]{\label{fig:dists_chisquare2}
\includegraphics[width=0.47\linewidth]{./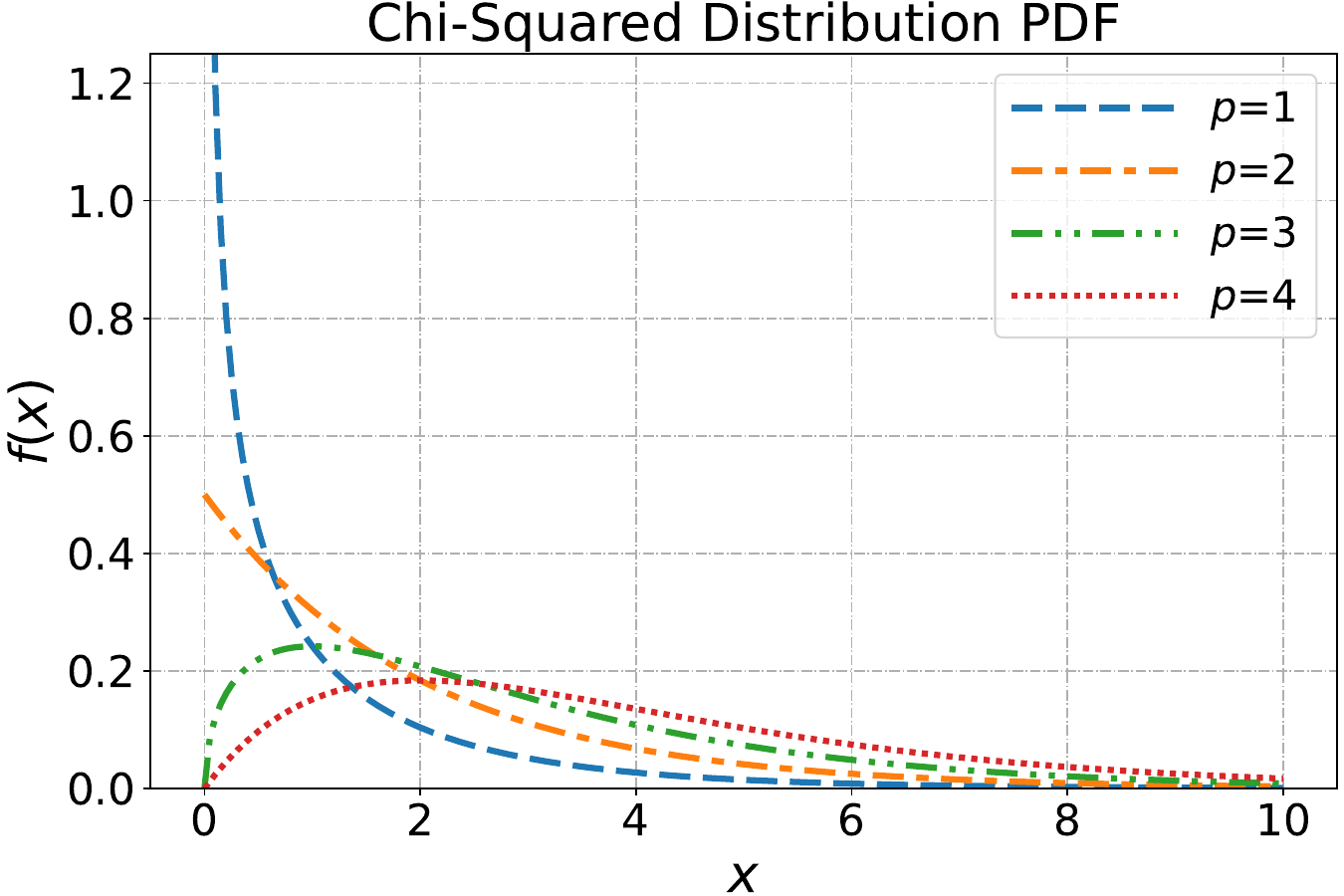}}
\caption{Comparison between the Gamma distribution and the Chi-squared distribution.}
\label{fig:gamma_chisquare_compare}
\end{figure}

Though the Chi-squared distribution is a special case of the Gamma distribution, it holds  particular significance in statistical theory.
\begin{definition}[Chi-squared distribution, $\chi^2$-Distribution]\label{definition:chisquare_dist}\index{Chi-squared distribution}
Let $\ra_i\sim \normal(0, 1)$ for $i\in\{1,2,\ldots,p\}$ (equivalently, $\rva \sim \normal(\bzero, \bI_{p})$; see Definition~\ref{definition:multivariate_gaussian}). 
Then, $\rx=\sum_{i=1}^p \ra_{i}^2$ follows the \textit{Chi-squared distribution} (or \textit{Chi-square distribution}, \textit{$\chi^2$-distribution}) with \textit{$p$ degrees of freedom}. 
We write $\rx \sim \chi_{(p)}^2$, and it is equivalent to $\rx\sim \gammadist(p/2, 1/2)$ in Definition~\ref{definition:gamma_distri}. The probability density function is given by 
$$ f(x; p)=\left\{
\begin{aligned}
	&\frac{1}{2^{p/2}\Gamma(\frac{p}{2})} x^{\frac{p}{2}-1} \exp(-\frac{x}{2}) ,& \mathrm{\,\,if\,\,} x \geq 0;  \\
	&0 , &\mathrm{\,\,if\,\,} x <0.
\end{aligned}
\right.
$$
The mean, variance of $\rx\sim \chi_{(p)}^2$ are given by 
$$
\Exp[\rx]=p, \qquad\Var[\rx]=2p.
$$
Figure~\ref{fig:dists_chisquare2} compares different parameters $p$ for the Chi-squared distribution.
\end{definition}

The definition shows that if $\rva=[\ra_1, \ra_2, \ldots, \ra_p]^\top \sim \normal(\bzero, \bI_{p})$ (the multivariate Gaussian distribution; as a hindsight, see Definition~\ref{definition:multivariate_gaussian}), then $\rx=\rva^\top\rva\sim \chi_{(p)}^2$. 
Given an orthogonal projection matrix $\bH$ of rank $r<p$ (as a hindsight, see Definition~\ref{definition:orthogonal-projection-matrix}), then it follows that 
\begin{equation}
\rva^\top \bH \rva \sim \chi_{(r)}^2, \text{ with orthogonal projector $\rank(\bH)=r<p$}.
\end{equation}
Suppose that $ \rvx \sim \normal(\bmu, \bSigma) $  where $\bmu\in\real^p$ and $\bSigma^{-1}$ is nonsingular. Then,
\begin{equation}
(\rvx - \bmu)^\top \bSigma^{-1} (\rvx - \bmu) \sim \chi^2_{(p)}.
\end{equation}
To see this, we note that $\bSigma$ is symmetric and positive definite. Then, we can write that $\bSigma = \bQ^\top \bLambda \bQ$, where $\bQ$ is an $p \times p$ orthogonal matrix and $\bLambda = \diag(\lambda_1,\lambda_2, \ldots, \lambda_p)$ with $\lambda_j > 0$ (as a hindsight, see Theorem~\ref{theorem:spectral_theorem}).
Then, define $\rvy \triangleq \bLambda^{-1/2} \bQ (\rvx - \bmu)$, which is a standardized version of $\rvx$. Vector $\rvy$ then is normally distributed by Lemma~\ref{lemma:affine_mult_gauss}:
$$
\rvy \sim \normal(\bzero, \bLambda^{-1/2} \bQ \bSigma \bQ^\top \bLambda^{-1/2}) = \normal(\bzero, \bLambda^{-1/2} \bQ \bQ^\top \bLambda \bLambda^{-1/2}) = \normal(\bzero, \bI_p).
$$
It follows that the elements of $\rvy$ are independent and that $\ry_i \sim \normal(0, 1)$. So,
$(\rvx - \bmu)^\top \bSigma^{-1} (\rvx - \bmu) = \rvy^\top\rvy \sim \chi^2_{(p)}$.

\begin{definition}[Inverse-Gamma distribution]\label{definition:inverse_gamma_distribution}
A random variable $\rx$ is said to follow the \textit{inverse-Gamma distribution} with shape parameter $r>0$ and scale parameter $\lambda>0$, denoted by $\rx\sim \inversegammadist(r, \lambda)$, if
$$ f(x; r, \lambda)=\left\{
\begin{aligned}
&\frac{\lambda^r}{\Gamma(r)} x^{-r-1} \exp(- \frac{\lambda}{x} ) ,& \mathrm{\,\,if\,\,} x > 0;  \\
&0 , &\mathrm{\,\,if\,\,} x \leq 0.
\end{aligned}
\right.
$$
The mean and variance of inverse-Gamma distribution are given by 
$$ \Exp[\rx]=\left\{
\begin{aligned}
&\frac{\lambda}{r-1}, \, &\mathrm{if\,} r\geq 1; \\
&\infty, \, &\mathrm{if\,} 0<r<1.
\end{aligned}
\right.\qquad
\Var[\rx]=\left\{
\begin{aligned}
&\frac{\lambda^2}{(r-1)^2(r-2)}, \, &\mathrm{if\,} r> 2; \\
&\infty, \, &\mathrm{if\,} 0<r\leq 2.
\end{aligned}
\right.
$$
Figure~\ref{fig:dists_inversegamma} illustrates the impact of different parameters $r$ and $\lambda$ for the inverse-Gamma distribution.
\end{definition}
If $\rx$ is Gamma distributed, then $\ry=1/\rx$ is inverse-Gamma distributed.
Note that the inverse-Gamma density is not simply the Gamma density with
$x$ replaced by $\frac{1}{y}$. There is an additional factor of $y^{-2}$.~\footnote{Which is from the \textit{Jacobian in the change-of-variables formula}. A short proof is provided here. Let $y=\frac{1}{x}$ where $y\sim \inversegammadist(r, \lambda)$ and $x\sim \gammadist(r, \lambda)$. Then, $f(y) |dy| = f(x) |dx|$, which results in $f(y) = f(x) \abs{\frac{dx}{dy}} = f(x)x^2 \xlongequal{ \mathrm{y}=\frac{1}{x}} \frac{\lambda^r}{\Gamma(r)} y^{-r-1} \exp(- \frac{\lambda}{y})$ for $y>0$. }  The inverse-Gamma distribution is useful as a prior for positive parameters. It imparts a quite heavy tail and keeps probability further from zero than the Gamma distribution (see examples in Figure~\ref{fig:dists_inversegamma}).

%

\index{Gamma distribution}
\index{Inverse-Gamma distribution}
\begin{figure}[h]
\centering
\vspace{-0.35cm}
\subfigtopskip=2pt
\subfigbottomskip=2pt
\subfigcapskip=-5pt
\subfigure[Inverse-Gamma probability density
functions for different values of the parameters $r$ and $\lambda$.]{\label{fig:dists_inversegamma}
\includegraphics[width=0.481\linewidth]{./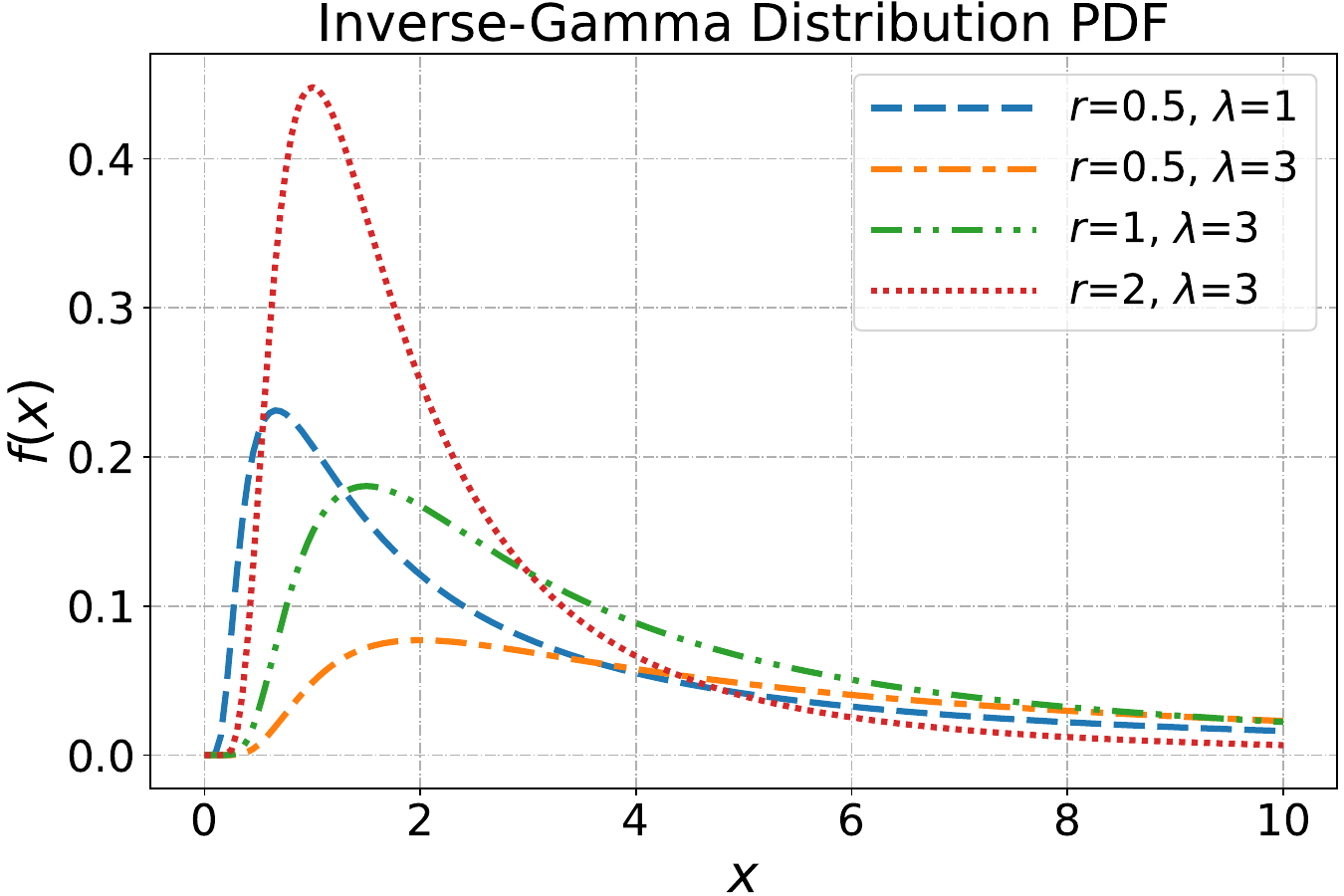}}
\subfigure[Inverse-Chi-squared probability density
functions for different values of the parameters $v$ and $s^2$.]{\label{fig:dist_inversechisquared}
\includegraphics[width=0.481\linewidth]{./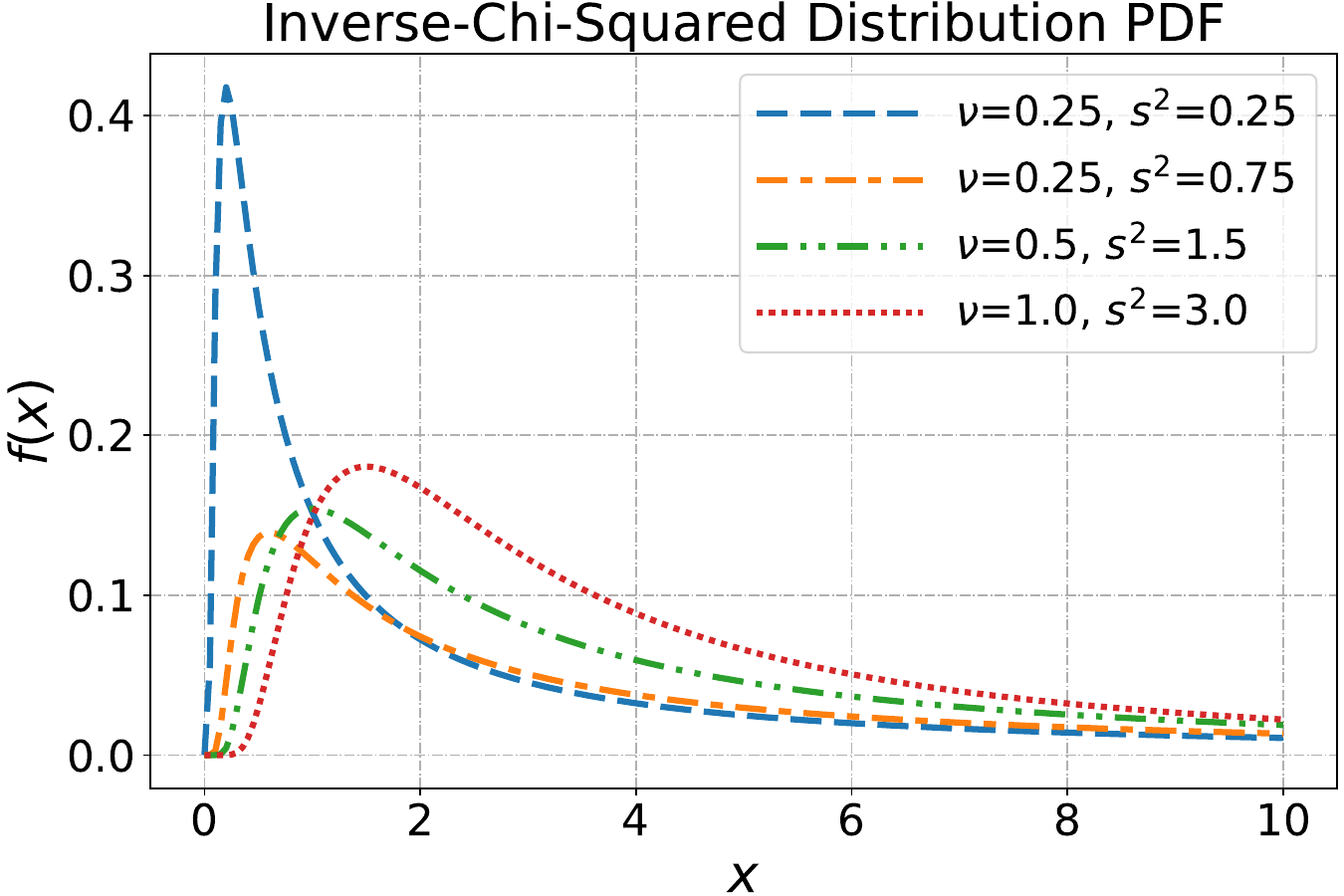}}
\caption{Comparison between the inverse-Gamma distribution and inverse-Chi-squared distribution for different values of the parameters.}
\label{fig:gamma_inversee_compare}
\end{figure}

\begin{definition}[Inverse-Chi-squared distribution]\label{definition:inverse-chi-square}
A random variable $\rx$ is said to follow the \textit{inverse-Chi-squared distribution} with parameter $\nu>0$ and $s^2>0$, denoted by $\rx\sim \inversegammadist(\frac{\nu}{2}, \frac{\nu s^2}{2})$, if
$$ f(x; \nu, s^2)=\left\{
\begin{aligned}
&\frac{{(\frac{\nu s^2}{2})}^{\frac{\nu}{2}}}{\Gamma(\frac{\nu}{2})} x^{-\frac{\nu}{2}-1} \exp(- \frac{\nu s^2}{2x} ) ,& \mathrm{\,\,if\,\,} x > 0;  \\
&0 , &\mathrm{\,\,if\,\,} x \leq 0.
\end{aligned}
\right.
$$
And it is also compactly denoted by $\rx \sim \inversechidist(\nu, s^2)$. The parameter $\nu >0$ is called the \textit{degrees of freedom}, and $s^2 > 0$ is the \textit{scale parameter}. And it is also known as the \textit{scaled} inverse-Chi-squared distribution.
The mean and variance of the inverse-Chi-squared distribution are given by 
$$ \Exp[\rx]=\left\{
\begin{aligned}
&\frac{\nu s^2}{\nu-2}, \, &\mathrm{if\,\,} \nu\geq 2; \\
&\infty, \, &\mathrm{if\,\,} 0<\nu<2.
\end{aligned}
\right.\qquad
\Var[\rx]=\left\{
\begin{aligned}
&\frac{2\nu^2 s^4}{(\nu-2)^2(\nu-4)}, \, &\mathrm{if\,\,} \nu\geq 4; \\
&\infty, \, &\mathrm{if\,\,} 0<\nu<4.
\end{aligned}
\right.
$$
To establish a connection with the inverse-Gamma distribution, we can set $S=\nu s^2$. Then the inverse-Chi-squared distribution can also be denoted by $\rx\sim \inversegammadist(\frac{\nu}{2}, \frac{S}{2})$ if $\rx \sim \inversechidist(\nu, s^2)$,  the form of which conforms to the univariate case of the inverse-Wishart distribution (see \citet{lu2023bayesian}). 
Figure~\ref{fig:dist_inversechisquared} illustrates the impact of different parameters $\nu$ and $s^2$ for the inverse-Chi-squared distribution.
\end{definition}

\index{Beta distribution}
\begin{definition}[Beta distribution]\label{definition:beta_distri}\index{Beta distribution}
A random variable $\rx$ is said to follow the \textit{Beta distribution} with parameter $a>0$ and $b>0$, denoted by $\rx \sim \betadist(a,b)$, if 
$$ 
f(x; a, b)=\left\{
\begin{aligned}
&\frac{1}{B(a,b)} x^{a-1}(1-x)^{b-1} ,& \mathrm{\,\,if\,\,} 0 \leq x \leq 1.  \\
&0 , &\mathrm{\,\,otherwise\,\,},
\end{aligned}
\right.
$$
where $B(a,b)$ denotes  \textit{Euler's Beta function} and it can be seen as a normalization term. Equivalently, $B(a,b)$ can be obtained by 
$$
B(a,b) = \frac{\Gamma(a)\Gamma(b)}{\Gamma(a+b)},
$$
where $\Gamma(\cdot)$ is the Gamma function.
The mean and variance of $\rx \sim \betadist(a, b)$ are given by 
\begin{equation}
\Exp[\rx] = \frac{a}{a+b}, \qquad \Var[\rx] = \frac{ab}{(a+b+1)(a+b)^2}. \nonumber
\end{equation}
Figure~\ref{fig:dists_beta} compares different parameters of $a$ and $b$ for  the Beta distribution. When $a=b=1$, the Beta distribution reduces to a uniform distribution in the range of 0 and 1; see Exercise~\ref{exercise:uniform_dist}.
\end{definition}

\begin{SCfigure}
\centering
\includegraphics[width=0.5\textwidth]{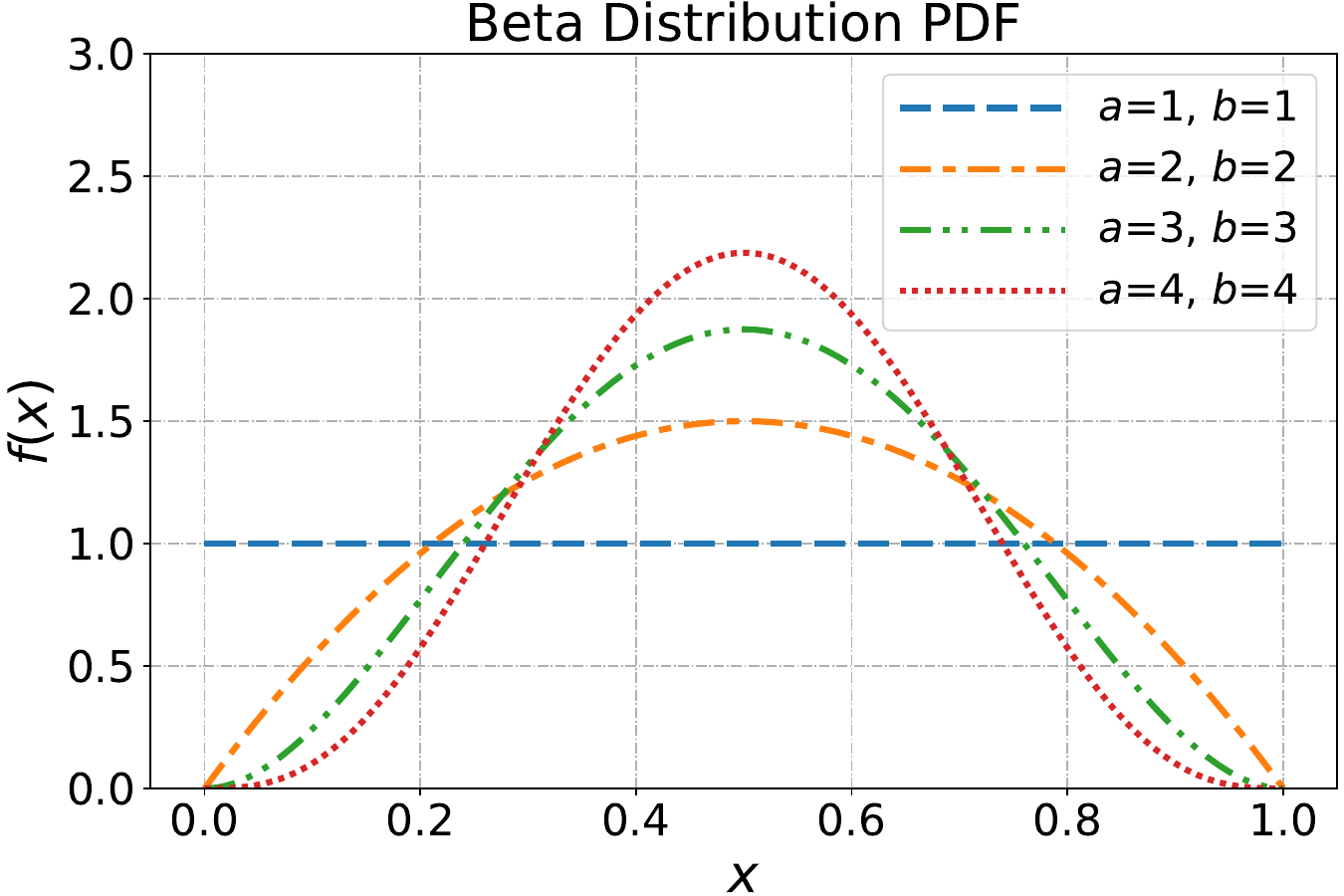}
\caption{Beta distribution probability density
functions for different values of the parameters $a$ and $b$. When $a=b=1$, the Beta distribution reduces to a uniform distribution in the range of 0 and 1.}
\label{fig:dists_beta}
\end{SCfigure}

The \textit{Poisson} distribution is a discrete probability distribution that characterizes the number of events in a fixed interval of time or space, given the average number of events in that interval.
The Poisson distribution is frequently employed for modeling  count data, such as the number of calls received by a call center in an hour or the number of emails received in a day provided that the probability of a ``success" for any given instance is ``very small." 
To name more examples where the Poisson distribution can be applied, e.g., the number of stars in a random area of the space; the distribution of bacteria on a surface; the number of typographical errors on a typed page; the number of wrong connections to a phone number. 

\begin{definition}[Poisson Distribution]\label{definition:poisson_distribution}
A random variable $\rx\in\{0,1,2,3,\ldots\}$ is said to follow the \textit{Poisson distribution} with rate parameter $\lambda>0$,  denoted by $\rx \sim \poissondist(\lambda)$, if 
$$
f(x; \lambda)= \frac{\lambda^x}{x!}  \exp(-\lambda).
$$
The mean and variance of $\rx \sim \poissondist( \lambda)$ are given by 
\begin{equation}
\Exp[\rx] = \lambda, \qquad \Var[\rx] =\lambda. \nonumber
\end{equation}
The support of an exponential distribution is on $\{0,1,2,3,\ldots\} = \{0\}\cup \naturalset$.
Figure~\ref{fig:dists_poisson} compares probability mass functions of different parameter values $\lambda$ for the Poisson distribution.
\end{definition}

The mean and variance of the Poisson distribution are equal.
Roughly speaking, a Poisson distribution is the limit of a binomial distribution when $n\rightarrow \infty$ and $\pi=\lambda/n$, i.e., the number of trials diverges to infinity but the probability of success decreases to zero linearly with respect to the number of trials. This is also known as the \textit{law of rare events}.
Therefore, the Poisson distribution is often employed to model rare events like radioactive decays.


\begin{SCfigure}
\centering
\includegraphics[width=0.5\textwidth]{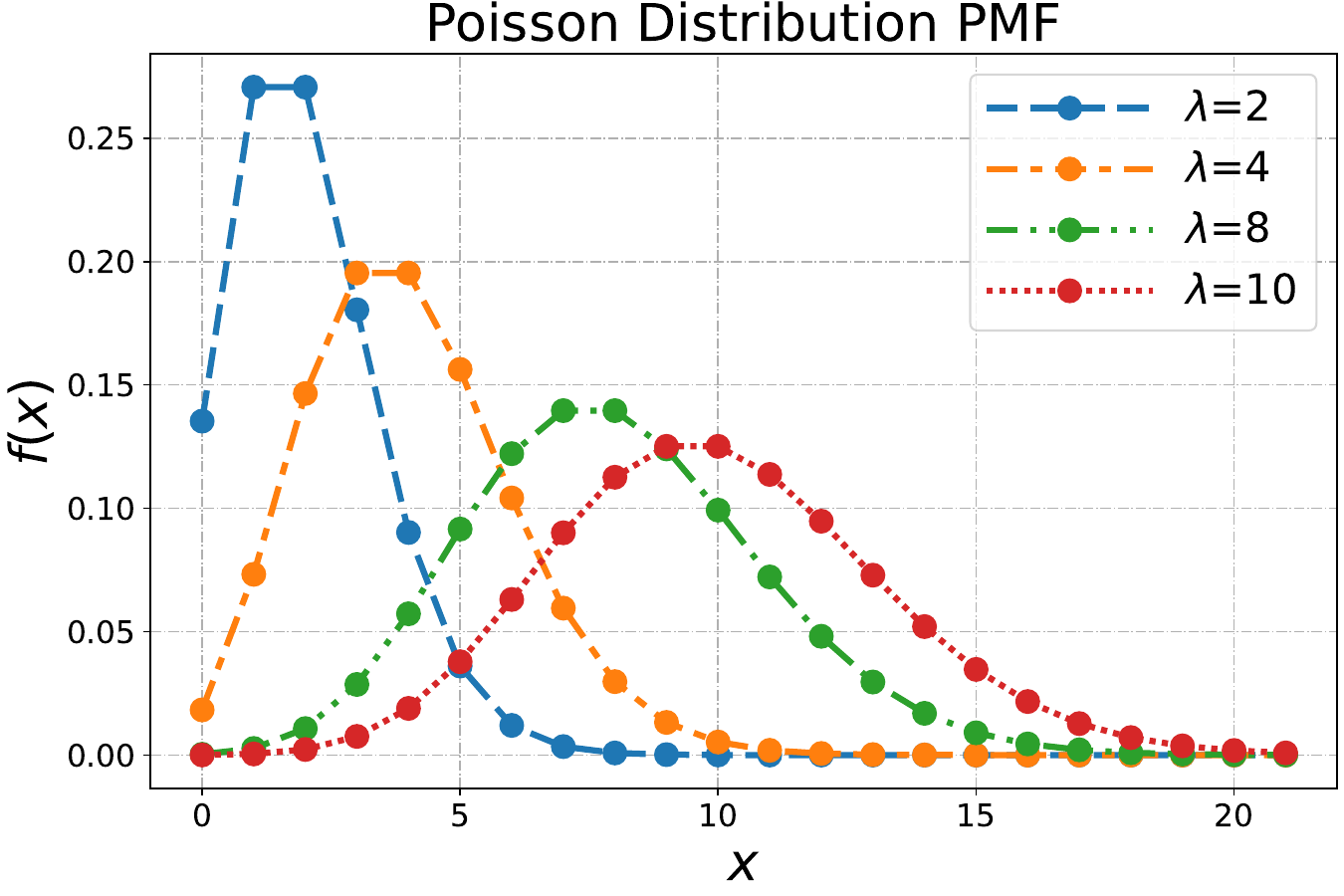}
\caption{Poisson probability mass functions for different values of the parameter $\lambda$.}
\label{fig:dists_poisson}
\end{SCfigure}

The sum of independently identical Poisson distributed random variables again follows a Poisson distribution.
\begin{theoremHigh}[Sum of Independently Distributed Poisson]\label{theorem:sum_iid_poisson}
Let $\rx_i\sim \poissondist(\lambda_i)$ for $i\in\{1,2,\ldots,n\}$. Then $ \ry=\sum_{i=1}^{n} \rx_i\sim \poissondist(\sum_{i=1}^{n}\lambda_i)$.
\end{theoremHigh}
For simplicity, we consider two independent Poisson random variables $\rx\sim \poissondist(\lambda_1)$ and $\ry\sim\poissondist(\lambda_2)$.
Define $\lambda\triangleq\lambda_1+\lambda_2$ and $\rz\triangleq\rx+\ry$. Then $\rz$ is a Poisson random variable with parameter $\lambda$. To see this, we have 
$$
\begin{aligned}
p(z) 
&= P(\rz=z) = \sum_{k=1}^{z} P(\rx=k) \cdot  P(\ry=z-k)
= \sum_{k=1}^{z} \frac{\lambda_1^k}{k!} \exp(-\lambda_1) \cdot \frac{\lambda_2^{z-k}}{(z-k)!} \exp(-\lambda_2)\\
&= \frac{\exp(-\lambda_1-\lambda_2)}{z!} \sum_{k=1}^{z} {z\choose k} \lambda_1^k\lambda_2^{z-k}
\stackrel{\dag}{=}\frac{\exp(-\lambda)}{z!}(\lambda_1+\lambda_2)^z = \frac{\lambda^z}{z!} \exp(-\lambda),
\end{aligned}
$$
where the equality ($\dag$) follows from the \textit{binomial theorem}.
Working for general, once we know  the sum of two Poisson random variables, we can keep adding more and more of them to obtain another Poisson variable.

\begin{theoremHigh}[Poisson and Multinomial]\label{theorem:multinomial_poisson}
Let $\rx_i\sim \poissondist(\lambda_i)$ be independent for $i\in\{1,2,\ldots, K\}$. Then the conditional distribution of $\rvx=[\rx_1,\rx_2,\ldots, \rx_k]^\top$ given $\sum_{i=1}^{K}\rx_i=N$ is $\multinomial_K(N,\{ p_1, p_2, \ldots,p_K\})$ with
$$
p_i= \frac{\lambda_i}{\lambda_1+\lambda_2+\ldots+\lambda_K}, \gap \text{for all }i\in\{1,2,\ldots,K\}.
$$
\end{theoremHigh}

\subsection{Common Multivariate Probability Distributions}\label{sec:multi_gaussian_conjugate_prior}

In this section, we further provide common multivariate probability distributions.
\subsection*{Multivariate Gaussian Distribution}\label{section:multi_gaussian_dist}

A \textit{multivariate Gaussian distribution} (also referred to as a \textit{multivariate normal distribution} or simply Gaussian distribution) is a continuous probability distribution  characterized by a  jointly normal distribution across multiple variables. 
It is fully described by its mean vector (of size equal to the number of variables) and covariance matrix (a square matrix of size equal to the number of variables). 
The covariance matrix encodes the pairwise relationships between variables in terms of the covariance between them. 
Widely applied in diverse domains like machine learning, statistics, and signal processing, the multivariate Gaussian proves (or simply called Gaussian when it's clear from the context) instrumental in modeling complex data distributions.
We first present the rigorous definition of the multivariate Gaussian distribution as follows.
\begin{definition}[Multivariate Gaussian distribution]\label{definition:multivariate_gaussian}
A random vector $\rvx \in \real^D$ is said to follow the \textit{multivariate Gaussian distribution (multivariate normal, MVN)} with parameters $\bmu\in\real^D$ and $\bSigma\in\real^{D\times D}$, denoted by $\rvx\sim \normal(\bmu, \bSigma)$, if
$$
\begin{aligned}
f(\bx; \bmu, \bSigma)&= (2\pi)^{-D/2} \abs{\bSigma}^{-1/2}\exp\left\{-\frac{1}{2}(\bx - \bmu)^\top \bSigma^{-1}(\bx - \bmu)\right\},~\footnote{The form of which can be proved using the moment generating function of $D$ i.i.d. univariate standard Gaussian variables.}
\end{aligned}
$$
where $\bmu \in \real^D$ is called the \textit{mean vector}, and $\bSigma\in \real^{D\times D}$ is positive definite and is called the \textit{covariance matrix}. $\abs{\bSigma} = \det(\bSigma)$ is the determinant of the matrix $\bSigma$.
The mean, mode, and covariance of the multivariate Gaussian distribution are given by 
\begin{equation*}
\begin{aligned}
\Exp [\rvx] &= \bmu, \qquad 
\mathrm{Mode}[\rvx] = \bmu, \qquad\text{and}\qquad 
\Cov [\rvx] = \bSigma. 
\end{aligned}
\end{equation*}
The covariance matrix can be obtained by 
$$
\Cov[\rvx]=\Exp[(\rvx-\bmu)(\rvx-\bmu)^\top]=\Exp[\rvx\rvx^\top]-\bmu\bmu^\top.
$$
Figure~\ref{fig:multi_gaussian_density} compares Gaussian density plots for different kinds of covariance matrices.
The multivariate Gaussian variable can be drawn from a univariate Gaussian density; see Problem~\ref{problem:multiGauss}.
\end{definition}

\begin{figure}[h]
\subfigure[Gaussian, $\bSigma =\begin{bmatrix}
1&0\\
0&1
\end{bmatrix}. $ ]{\includegraphics[width=0.31
\textwidth]{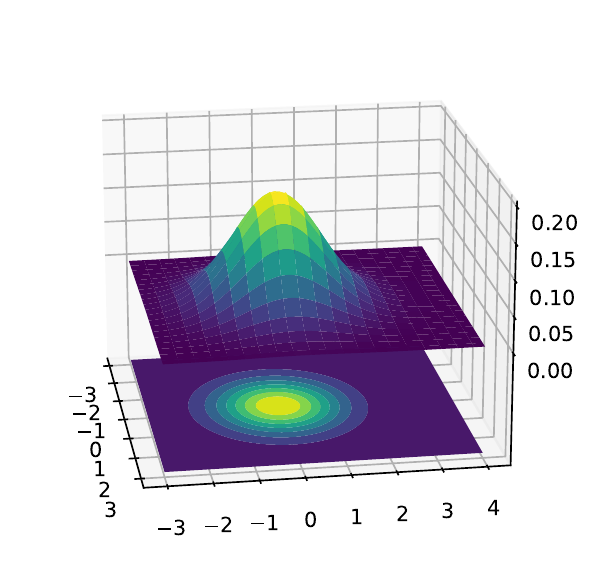} \label{fig:dists_multiGauss_sigma1}}
\subfigure[Gaussian, $\bSigma =\begin{bmatrix}
1&0\\
0&3
\end{bmatrix}.$]{\includegraphics[width=0.31
\textwidth]{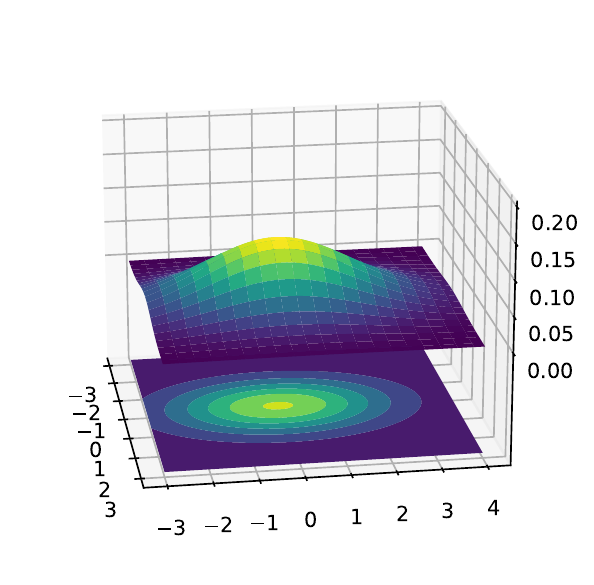} \label{fig:dists_multiGauss_sigma2}}
\subfigure[Gaussian, $\bSigma =\begin{bmatrix}
1&\textendash0.5\\
\textendash0.5&1.5
\end{bmatrix}.$]{\includegraphics[width=0.31 
\textwidth]{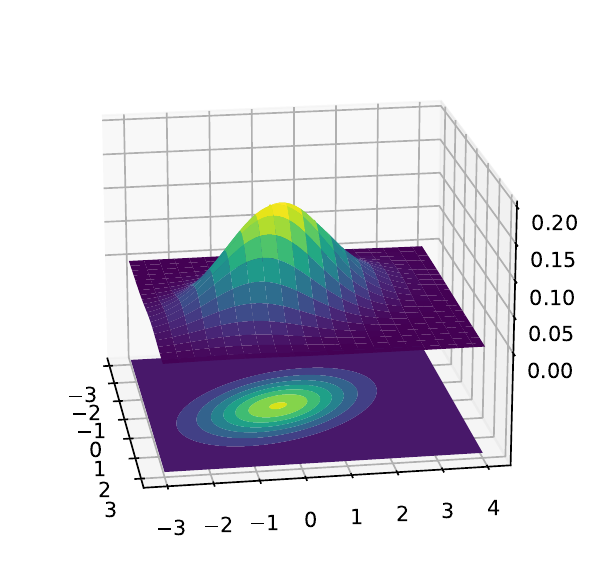} \label{fig:dists_multiGauss_sigma3}}
\subfigure[Gaussian, $\bSigma =\begin{bmatrix}
2&0\\
0&2
\end{bmatrix}. $ ]{\includegraphics[width=0.31
\textwidth]{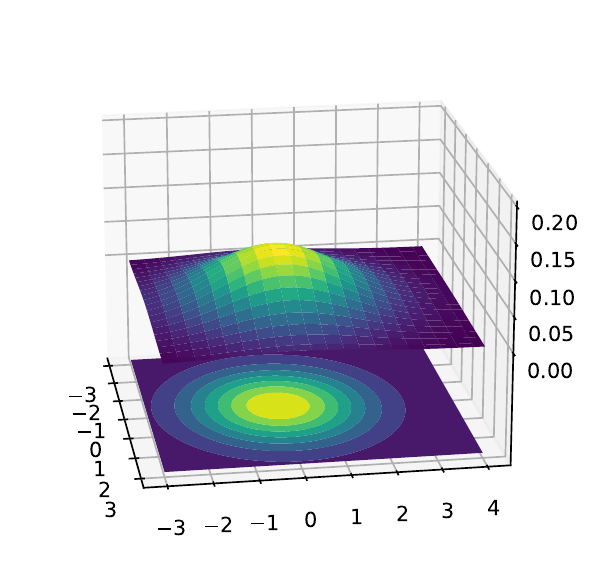} \label{fig:dists_multiGauss_sigma4}}
\subfigure[Gaussian, $\bSigma =\begin{bmatrix}
3&0\\
0&1
\end{bmatrix}.$]{\includegraphics[width=0.31
\textwidth]{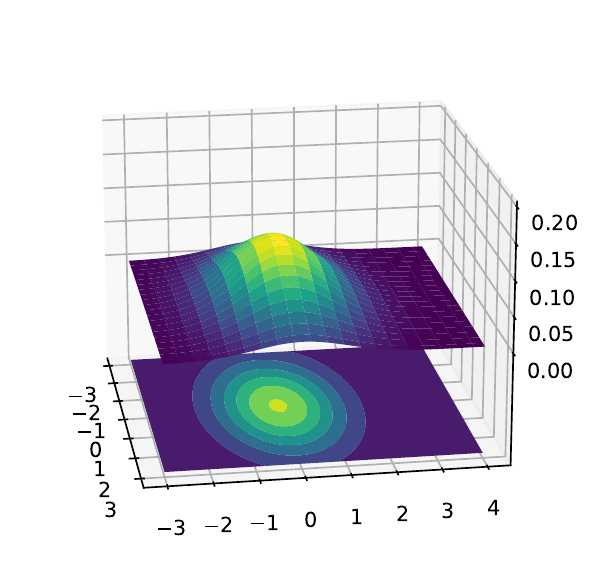} \label{fig:dists_multiGauss_sigma5}}
\subfigure[Gaussian, $\bSigma =\begin{bmatrix}
3&\textendash0.5\\
\textendash0.5&1.5
\end{bmatrix}.$]{\includegraphics[width=0.31
\textwidth]{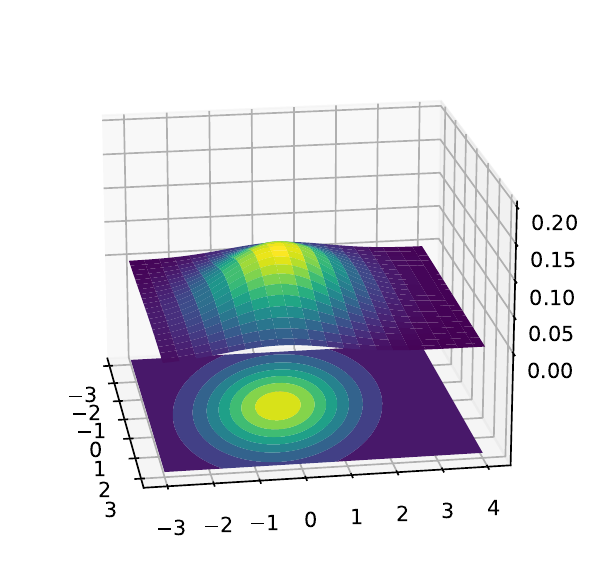} \label{fig:dists_multiGauss_sigma6}}
\centering
\caption{Density and contour plots (\textcolor{mydarkblue}{blue}=low, \textcolor{mydarkyellow}{yellow}=high) for the multivariate Gaussian distribution over the $\mathbb{R}^2$ space for various values of the covariance/scale matrix with a zero-mean vector.  Fig~\ref{fig:dists_multiGauss_sigma1} and \ref{fig:dists_multiGauss_sigma4}: A spherical covariance matrix has a circular shape; 
Fig~\ref{fig:dists_multiGauss_sigma2} and \ref{fig:dists_multiGauss_sigma5}: A diagonal covariance matrix is an \textit{axis aligned} ellipse; 
Fig~\ref{fig:dists_multiGauss_sigma3} and \ref{fig:dists_multiGauss_sigma6}: A full covariance matrix has an  elliptical shape.}
\centering
\label{fig:multi_gaussian_density}
\end{figure}

Similar to the likelihood under univariate Gaussian distribution  (Equation~\eqref{equation:uni_gaussian_likelihood}), especially in deriving the conjugate Bayesian result, 
the likelihood of $N$ random observations $\mathcal{X} = \{\bx_1, \bx_2, \ldots , \bx_N \}$  generated by a multivariate Gaussian with mean vector $\bmu$ and covariance matrix $\bSigma$ is given by 
\begin{equation}\label{equation:multi_gaussian_likelihood}
\begin{aligned}
&\gap p(\mathcal{X} \mid \bmu, \bSigma) =\prod^N_{n=1} \mathcal{N} (\bx_n\mid \bmu, \bSigma) \\
&\overset{(a)}{=} (2\pi)^{-ND/2} \abs{\bSigma}^{-N/2}\exp\left\{-\frac{1}{2} \sum^N_{n=1}(\bx_n - \bmu)^\top \bSigma^{-1}(\bx_n - \bmu)\right\} \\
&\overset{(b)}{=} (2\pi)^{-ND/2} \abs{\bSigma}^{-N/2}\exp\left\{-\frac{1}{2} \tr( \bSigma^{-1}\bS_{\bmu} )  \right\}\\
&\overset{(c)}{=} (2\pi)^{-ND/2} \abs{\bSigma}^{-N/2}\exp\left\{-\frac{N}{2}(\bmu - \overline{\bx})^\top \bSigma^{-1}(\bmu - \overline{\bx})\right\}  \exp\left\{-\frac{1}{2}\tr( \bSigma^{-1}\bS_{\overline{x}} )\right\},
\end{aligned}
\end{equation}
where 
\begin{equation}\label{equation:mvu-sample-covariance}
\bS_{\bmu} \triangleq \sum^N_{n=1}(\bx_n - \bmu)(\bx_n - \bmu)^\top,\quad
\bS_{\overline{x}} \triangleq \sum^N_{n=1}(\bx_n - \overline{\bx})(\bx_n - \overline{\bx})^\top, \quad
\overline{\bx} \triangleq\frac{1}{N}\sum^N_{n=1}\bx_n.
\end{equation}
The matrix $\bS_{\overline{x}}$ is the \textit{matrix of sum of squares} and is also known as the \textit{scatter matrix}.
The equivalence between equation (a) and equation (c)  follows from the following identity (similar reasoning applies to the equivalence between equation (a) and equation (b)):
\begin{align}
\sum^N_{n=1}(\bx_n - \bmu)^\top\bSigma^{-1}(\bx_n - \bmu) = \tr(\bSigma^{-1}\bS_{\overline{x}}) + N \cdot (\overline{\bx} - \bmu)^\top\bSigma^{-1}(\overline{\bx} - \bmu),
\label{equation:multi_gaussian_identity}
\end{align}
where the trace of a square matrix $\bm{A}$ is defined to be the sum of the diagonal elements $a_{ii}$ of $\bm{A}$: $\tr(\bm{A}) = \sum_i a_{ii}$.
\begin{proof}[Proof of Identity~\ref{equation:multi_gaussian_identity}]
There is a ``trick" involving the trace that makes such calculations easy (see also Chapter 3 of \citet{gentle2007matrix}):
\begin{equation}
\bx^\top \bm{A} \bx = \tr(\bx^\top \bm{A} \bx) = \tr(\bx \bx^\top \bm{A}) =  \tr(\bm{A} \bx \bx^\top ),
\end{equation}
where the first equality follows from the fact that $\bx^\top \bm{A} \bx$ is a scalar and the trace of a product is invariant under cyclical permutations of the factors. 

We can then rewrite $\sum^N_{n=1}(\bx_n - \bmu)^\top\bSigma^{-1}(\bx_n - \bmu)$ as 
\begin{equation}
\begin{aligned}
&\, \sum^N_{n=1}(\bx_n - \overline{\bx})^\top\bSigma^{-1}(\bx_n - \overline{\bx}) + \sum^N_{n=1}(\overline{\bx} - \bmu)^\top\bSigma^{-1}(\overline{\bx} - \bmu) \\
&= \tr(\bSigma^{-1} \bS_{\overline{x}}  ) + N \cdot (\overline{\bx} - \bmu)^\top\bSigma^{-1}(\overline{\bx} - \bmu).
\end{aligned}
\end{equation}
This concludes the proof.
\end{proof}

Given fixed mean $\bmu$ and covariance $\bSigma$ parameters, we have 
\begin{equation}\label{equation:multi_gaussian_form_conform}
	\begin{aligned}
		p(\bx\mid \bmu, \bSigma) &=\normal(\bx \mid \bmu, \bSigma)
		\propto \exp\left\{ -\frac{1}{2} \bx^\top\bSigma^{-1}\bx  + \bx^\top \bSigma^{-1}\bmu \right\}.
	\end{aligned}
\end{equation}

\subsection*{Properties of Multivariate Gaussian Distribution}\label{section:multi_gauss}
The entropy of Gaussians (measured in natural units) is discussed in Problem~\ref{problem:entropy_mgau}.
The affine transformation, rotation, independence of linear combinations of multivariate Gaussian distribution also follows the multivariate Gaussian distribution.
\begin{lemma}[Affine transformation of multivariate Gaussian distribution]\label{lemma:affine_mult_gauss}

Given fixed matrices and vector, $\bA,\bB\in\real^{p\times d}$ and $\bc\in\real^p$, 
let $\rvx\sim \normal(\bmu_x, \bSigma_x)$ and $\rvy\sim \normal(\bmu_y, \bSigma_y)$ be independent variables (of length $d$).
Then, 
$$
\rvz=\bA\rvx+\bB\rvy +\bc \sim \normal(\bA\bmu_x+\bB\bmu_y+\bc, \bA\bSigma_x\bA^\top +\bB\bSigma_y\bB^\top).
$$
Given further $\bd\in\real^{d}$, then $\bd^\top\rvx$ follows from a univariate Gaussian:
$$
\bd^\top\rvx \sim \normal(\bd^\top\bmu_x, \bd^\top\bSigma_x\bd).
$$
\end{lemma}
The result can be proved using the moment generating function of multivariate Gaussian distributions.
The  result also relies on the \textit{sum of independent Gaussians}:
$$
\sum_{i=1}^{n} \rvx_i \sim \normal\big(\sum_{i=1}^{n}\bmu_i, \sum_{i=1}^{n}\bSigma_i\big)
\gap 
\text{if } \rvx_i\sim\normal(\bmu_i,\bSigma_i), \ \forall\, i\in\{1,2,\ldots,n\}.
$$
Moreover, let $\bA=\be_i^\top$ be a unit basis vector, then $\rx_i = \be_i^\top\rvx \sim \normal(\mu_{x,i}, \sigma_{x,ii}^2)$, where $\mu_{x,i}$ represents the $i$-th component of $\bmu_x$ and $\sigma_{x,ii}^2$ denotes the $i$-th diagonal of $\bSigma_x$.
\begin{lemma}[Rotations on  multivariate Gaussian distribution]\label{lemma:rotat_multi_gauss}
	Rotations on the Gaussian distribution do not affect the distribution.
	That is, for any orthogonal matrix $\bQ$ with $\bQ\bQ^\top=\bQ^\top\bQ=\bI$, if $\rvv\sim \normal(\bzero, \sigma^2\bI)$, then $\bQ\rvv\sim \normal(\bzero, \sigma^2\bI)$.
\end{lemma}

\index{Independence}
\begin{lemma}[Independence of linear combinations in Gaussian Distributions]
Suppose $ \rvx \sim \normal(\bmu, \bSigma) \in\real^{p} $ , and let $ \bA\in\real^{m \times p} $, $ \bB\in\real^{d \times p} $ be real matrices. Then,
\begin{equation}
\text{$\bA\rvx$ \text{ is independent of } $\bB\rvx$ $\quad\iff\quad$ $\bA\bSigma \bB^\top = \bzero$.}
\end{equation}
\end{lemma}
The proof again relies on the moment generating function of Gaussian distributions and we shall not provide the details.

\paragrapharrow{``Standardization and decorrelation."}
The distribution $\normal(\bzero, \bI)$ is called the \textit{standard multivariate Gaussian distribution}. 
Given $\rvx\sim\normal(\bmu, \bSigma)$, then the \textit{decorrelation} of $\rvx$ follows that 
\begin{equation}\label{equation:std_mugau_recov}
\rvx\sim\normal(\bmu, \bSigma)
\quad\implies \quad
\rvz =\bSigma^{-1/2}(\rvx-\bmu) \sim \normal(\bzero,\bI).
\end{equation}
This also shows that if $\rvx\sim\normal(\bmu,\bSigma)$, then 
\begin{equation}
\rvx=\bmu+\bSigma^{1/2}\bepsilon,
\gap \text{where }\bepsilon\sim\normal(\bzero,\bI).
\end{equation}

Suppose $\{\bx_i,\bx_2,\ldots,\bx_n\}$ are $n$ random samples of $\normal(\bmu,\bSigma)$ and let $\overline{\bx} = \frac{1}{n} \sum_{i=1}^{n} \bx_i $. Then, it follows that 
\begin{equation}\label{equation:mean_mutigau}
\sqrt{n} (\overline{\bx}-\bmu) \sim \normal(\bzero, \bSigma).
\end{equation}

\paragrapharrow{Partition of Gaussian.}
Let $\rvx \sim \normal(\bmu, \bSigma)$ where $\bmu\in\real^p$. Consider the partition of $\rvx$
$$
\begin{bmatrix}
\rvx_1 \\
\rvx_2
\end{bmatrix}
\sim \normal(\bmu, \bSigma) = \normal
\left(
\begin{bmatrix}
\bmu_1 \\
\bmu_2
\end{bmatrix},
\begin{bmatrix}
\bSigma_{11} & \bSigma_{12} \\
\bSigma_{21} & \bSigma_{22}
\end{bmatrix}
\right).
$$
Then $\rvx_1$ and $\rvx_2$ are independent if and only if $\bSigma_{12} = \bzero$. 
Furthermore, Let $\rvx=[\rx_1, \rx_1, \ldots, \rx_p ]^\top\sim \normal(\bmu, \bSigma)$. Then, 
\begin{equation}\label{equation:iid_mulg_iff7}
\text{the $\rx_i$'s are mutually independent if and only if $\bSigma$ is diagonal.}
\end{equation}
Rigorously, this can be proved as follows:
\begin{proof}[of Equation~\eqref{equation:iid_mulg_iff7}]
Suppose that the $\rx_i$'s are independent. The property below Lemma~\ref{lemma:affine_mult_gauss} yields $\rx_i \sim \normal(\mu_i, \sigma_i^2)$ for some $\sigma_i > 0$. Thus the density of $\rvx$ is
\begin{align*}
p_{\rvx}(\bx)\ &= \prod_{i=1}^{p} p_{\rx_i}(x_i) = \prod_{i=1}^{p} \frac{1}{\sigma_i \sqrt{2\pi}} \exp \left\{ -\frac{1}{2} \frac{(x_i - \mu_i)^2}{\sigma_i^2} \right\} \\
&= \frac{1}{(2\pi)^{p/2} \abs{\diag(\sigma_1^2, \ldots, \sigma_p^2)}^{1/2}} \exp \left\{ -\frac{1}{2} (\bx - \bmu)^\top \diag(\sigma_1^{-2}, \ldots, \sigma_p^{-2}) (\bx - \bmu) \right\}.
\end{align*}
Hence $\rvx \sim \normal(\bmu, \diag(\sigma_1^2, \ldots, \sigma_p^2))$, i.e., the covariance $\bSigma$ is diagonal.

Conversely, assume $\bSigma$ is diagonal, say $\bSigma = \diag(\sigma_1^2, \ldots, \sigma_p^2)$. Then we can reverse the steps of the first part to see that the joint density $p_{\rvx}(\bx)$ can be written as a product of the marginal densities $p_{\rx_i}(x_i)$, thus proving independence.
\end{proof}

It also follows that functions of independent vectors should also be independent. That is, it should be that $g_1(\rvx_1)$ and $g_2(\rvx_2)$ are independent for all $g_1, g_2$. 
Suppose that $\rx_i$ is i.i.d. $\normal(\mu, \sigma^2)$ for $i\in\{1,2,\ldots,p\}$. Then, we can define a vector populated by $\overline{\rx}$ and $\rx_i - \overline{\rx}$:
$$
\begin{bmatrix}
	\overline{\rx} \\
	\rx_1 - \overline{\rx} \\
	\vdots \\
	\rx_p - \overline{\rx}
\end{bmatrix}
=
\begin{bmatrix}
	\frac{1}{p} & \frac{1}{p} & \ldots & \frac{1}{p} \\
	& \bI_p - \frac{1}{p}\bJ_p &
\end{bmatrix}
\begin{bmatrix}
	\rx_1 \\
	\rx_2 \\
	\vdots \\
	\rx_p
\end{bmatrix}
\qquad \text{where} \qquad
\bJ_p =
\begin{bmatrix}
	1 & 1 & \cdots & 1 \\
	1 & \ddots & & 1 \\
	1 & & \ddots & \vdots \\
	1 & 1 & \cdots & 1
\end{bmatrix}\in\real^{p\times p}.
$$
It then follows that 
$$
\begin{bmatrix}
	\overline{\rx} \\
	\rx_1 - \overline{\rx} \\
	\vdots \\
	\rx_p - \overline{\rx}
\end{bmatrix}
\sim
\mathcal{N}
\left(
\begin{bmatrix}
	\mu \\
	0 \\
	\vdots \\
	0
\end{bmatrix},
\sigma^2
\begin{bmatrix}
	\frac{1}{p} & \bzero \\
	\bzero & \bI_p - \frac{1}{p}\bJ_p
\end{bmatrix}
\right).
$$
Therefore, we find that $\overline{\rx}$ is independent of $\rx_1 - \overline{\rx},\rx_2 - \overline{\rx}, \ldots, \rx_p - \overline{\rx}$. 
In many applications, we may construct a random variable:
$$
t \triangleq  \sqrt{p} \frac{(\overline{\rx} - \mu) / \sigma}{\sqrt{\frac{1}{p-1} \sum (y_i - \overline{\rx})^2 / \sigma^2}},
$$
in which case, the numerator and the denominator are independent variables.

\paragrapharrow{Quadratic of Gaussian.} 
The definition of the Chi-squared distribution (Definition~\ref{definition:chisquare_dist}) shows 
$$
\sum_{i=1}^{p} \rx_i^2\sim \chi^2(p), 
\gap \text{if }\rx_i \sim  \normal(0,1).
$$
Therefore, we also have
\begin{equation}
\rvx\sim\normal(\bmu, \bSigma)
\implies 
\rvz =(\rvx-\bmu)^\top \bSigma^{-1}(\rvx-\bmu)\sim \chi^2(p), \gap \text{where }\rvx\in\real^p.
\end{equation}

We may also be interested in the quadratic form of $\rvx^\top\bA\rvx$ where $\bA$ is symmetric. We provide some important results below.
\begin{theoremHigh}[Quadratic of Gaussians]
We have the following results with quadratic forms of Gaussians:
\begin{itemize}
\item 
Given $\rvx\sim\normal(\bzero,\lambda\bI)$ (of length $p$) and symmetric matrix $\bA\in\real^{p\times p}$. Then, it follows that 
$$
\frac{\rvx^\top\bA\rvx}{\lambda} \sim \chisquared_{(n)},
$$
if and only if $\bA$ ($\bA^2=\bA$) is idempotent with rank $n<p$.
\item Given $\rvx\sim\normal(\bzero,\bSigma)$ (of length $p$) and symmetric matrix $\bA\in\real^{p\times p}$. Then, it follows that 
$$
\rvx^\top\bA\rvx \sim \chisquared_{(n)},
$$
if and only if $\bA\bSigma$ is idempotent with rank $n<p$.
\end{itemize}
\end{theoremHigh}

%

\paragrapharrow{Marginal and conditional distributions.}
Let $\rvx$ and $\rvy$ be jointly Gaussian random vectors with 
$$
\rvz=
\begin{bmatrix}
\rvx\\
\rvy 
\end{bmatrix}
\sim 
\normal\left(
\begin{bmatrix}
\bmu_x\\
\bmu_y 
\end{bmatrix}
,
\begin{bmatrix}
\bA & \bC\\
\bC^\top & \bB
\end{bmatrix}
\right)=
\normal\left(
\begin{bmatrix}
\bmu_x\\
\bmu_y 
\end{bmatrix}
,
\begin{bmatrix}
\widetildebA & \widetildebC\\
\widetildebC^\top & \widetildebB
\end{bmatrix}^{-1}
\right).~\footnote{
Given nonsingular $\bM$ and its inverse $\bM^{-1}$; and suppose appropriate sizes for the following partitions \citep{williams2006gaussian}:
$$
\bM=
\begin{bmatrix}
\bA & \bB\\
\bC & \bD
\end{bmatrix},
\gap 
\bM^{-1}=
\begin{bmatrix}
\widetildebA & \widetildebB\\
\widetildebC & \widetildebD
\end{bmatrix}.
$$
We have 
\begin{equation}\label{equation:mt_inv}
\begin{aligned}
&\widetildebA=\bA^{-1}+\bA^{-1}\bB\widetildebD\bC\bA^{-1}&=& (\bA-\bB\bD^{-1}\bC)^{-1} , \\
&\widetildebB=-\bA^{-1}\bB\widetildebD&=& -\widetildebA\bB\bD^{-1},\\
&\widetildebC=-\widetildebD\bC\bA^{-1}&=& -\bD^{-1}\bC\widetildebA, \\
& \widetildebD=(\bD-\bC\bA^{-1}\bB)^{-1}&=& \bD^{-1}+\bD^{-1}\bC\widetildebA\bB\bD^{-1},
\end{aligned}
\end{equation}
}
$$
where $\rvx$ and $\rvy$ are \textit{independent} if and only if $\Cov[\rvx,\rvy]=\bC=\bzero$.
Then every marginal distribution of a multivariate Gaussian distribution is itself a multivariate Gaussian distribution, and the conditional distribution $\rvx\mid \rvy$ also follows a multivariate Gaussian distribution:
\begin{equation}
\begin{aligned}
\rvx
\sim\normal(\bmu_x,\bA),
\gap 
\rvx\mid \rvy=\by 
&\sim \normal(\bmu_x+\bC\bB^{-1}(\by-\bmu_y), \bA-\bC\bB^{-1}\bC^\top)\\
&=\normal(\bmu_x-\widetildebA^{-1}\widetildebC(\by-\bmu_y), \widetildebA^{-1});\\
\rvy
\sim\normal(\bmu_y,\bB),
\gap 
\rvy\mid \rvx=\bx
&\sim \normal(\bmu_y+\bC^\top\bA^{-1}(\bx-\bmu_x), \bB-\bC^\top\bA^{-1}\bC)\\
&=\normal(\bmu_y-\widetildebB^{-1}\widetildebC^\top(\bx-\bmu_x), \widetildebB^{-1}).\\
\end{aligned}
\end{equation}
\begin{proof}[Short proof]
Suppose $\rvx^\prime =\rvx-\bC\bB^{-1}$. Then 
$$
\rvz^\prime = 
\begin{bmatrix}
\rvx^\prime \\
\rvy 
\end{bmatrix}
=
\begin{bmatrix}
\bI & -\bC\bB^{-1}\\
\bzero & \bI 
\end{bmatrix}
\rvz.
$$
Using Lemma~\ref{lemma:affine_mult_gauss}, we can show that $\rvx^\prime$ and $\rvy$ are independent.
Then, the conditional distribution of $\rvx \mid \rvy$ can be obtained by $\rvx=\rvx^\prime + \bC\bB^{-1}\rvy$  and following the distribution law.
The second part can be proved similarly.
\end{proof}

This relationship is useful for finding Gaussian-related distributions. See the exercise below.
\begin{exercise}[Affine dependence of Gaussian variables]
Suppose random vectors $\rvx\sim\normal(\bmu, \bSigma)$ and $\rvy\mid \rvx=\bx\sim\normal(\bA\bx+\bb, \bM)$.
Note $\rvy$ is not simply the affine transformation $\bA\rvx+\bb$, but it follows that $\rvy=\bA\bx+\bb+\bepsilon$ where $\bepsilon\sim \normal(\bzero, \bM)$.
Show that 
$$
\rvy\sim \normal(\bA\bmu+\bb, \bM+\bA\bSigma\bA^\top),
\gap 
\rvx \mid \rvy \sim \normal\big(\bL\big\{\bA^\top\bM^{-1}(\rvy-\bb)+\bSigma^{-1}\bmu  \big\}, \bL\big), 
$$
where $\bL=(\bSigma^{-1}+\bA^\top\bM^{-1}\bA)^{-1}$.
\textit{Hint: compute the cross-covariance of $\rvx$ and $\rvy$ by $\Cov[\rvx,\rvy]=\Exp[(\rvx-\bmu_x)(\rvy-\bmu_y)^\top]=\bSigma\bA^\top$ where $\bmu_x=\bmu$ and $\bmu_y=\Exp[\rvy]=\bA\bmu+\bb$, and use Woodbury matrix identity: $(\bA+\bB\bD\bC)^{-1} = \bA^{-1} - \bA^{-1} \bB(\bD^{-1} + \bC\bA^{-1}\bB)^{-1}\bC\bA^{-1}$ for appropriate matrices $\bA,\bB,\bC$, and $\bD$; see, for example,  \citet{lu2021numerical}}.
\end{exercise}

\paragrapharrow{Product of Gaussians.}
The product of two Gaussians also follows a Gaussian distribution (although no longer normalized) \citep{ahrendt2005multivariate}. 
Given two Gaussians $\normal(\bmu_a, \bSigma_a)$ and $\normal(\bmu_b, \bSigma_b)$ (both of length $p$), it follows that 
\begin{equation}
\normal(\bmu_a, \bSigma_a)\cdot \normal(\bmu_b, \bSigma_b) \propto z_c\normal(\bmu_c,\bSigma_c),
\end{equation}
where 
$$
\bSigma_c=(\bSigma_a^{-1}+\bSigma_b^{-1})^{-1},
\gap
\text{and}
\gap 
\bmu_c = \bSigma_c(\bSigma_a^{-1}\bmu_a + \bSigma_b^{-1}\bmu_b).
$$
That is, the resulting precision matrix is the sum of precision matrices of the two components.
And $z_c$ is a normalization constant
$$
\begin{aligned}
	z_c=\abs{2\pi \bSigma_a\bSigma_b\bSigma_c^{-1}}^{-\frac{1}{2}} \exp\left\{-\frac{1}{2}(\bmu_a-\bmu_b)^\top \bSigma_a^{-1}\bSigma_c\bSigma_b^{-1}(\bmu_a-\bmu_b)\right\}.
\end{aligned}
$$

\subsection*{Multivariate Student's $t$ Distribution}
The multivariate Student's $t$-distribution is a continuous probability distribution over multiple variables that generalizes the Gaussian distribution to allow for heavier tails, 
i.e., the probability of extreme values is higher than that in a Gaussian distribution.
The multivariate Student's $t$ distribution (simply called Student's $t$ distribution when it's clear from the context) will be often used in the posterior predictive distribution of multivariate Gaussian parameters. We rigorously define the distribution as follows.

\index{Multivariate Student's $t$ distribution}
\begin{definition}[Multivariate Student's $t$ distribution]\label{definition:multivariate-stu-t}
A random vector $\rvx\in\real^D$ is said to follow the \textit{multivariate Student's $t$ distribution} with parameters $\bmu\in\real^D$, $\bSigma\in\real^{D\times D}$, and $\nu$, denoted by $\rvx \sim \tau( \bmu, \bSigma, \nu)$, if
$$
\begin{aligned}
f(\bx; \bmu, \bSigma, \nu)&= \frac{\Gamma(\nu/2 + D/2)}{\Gamma(\nu/2)} \frac{\abs{\bSigma}^{-1/2}}{\nu^{D/2} \pi^{D/2}} \times \left[ 1+ \frac{1}{\nu} (\bx-\bmu)^\top \bSigma^{-1} (\bx-\bmu)  \right]^{-(\frac{\nu+D}{2})}\\
&= \frac{\Gamma(\nu/2 + D/2)}{\Gamma(\nu/2)} |\pi\bV|^{-1/2} \times \left[ 1+ \frac{1}{\nu} (\bx-\bmu)^\top \bV^{-1} (\bx-\bmu)  \right]^{-(\frac{\nu+D}{2})},
\end{aligned}
$$
where $\bSigma$ is called the \textit{scale matrix} and $\bV=\nu\bSigma$, and $\nu$ is the \textit{degree of freedom}. This distribution has fatter tails than a Gaussian one. The smaller the $\nu$ is, the fatter the tails. As $\nu \rightarrow \infty$, the distribution converges towards a multivariate Gaussian.
The mean, mode, and covariance of the multivariate Student's $t$ distribution are given by 
\begin{equation*}
\begin{aligned}
\Exp [\bx] &= \bmu, \qquad 
\mathrm{Mode}[\bx] = \bmu, \qquad\text{and}\qquad
\Cov [\bx] = \frac{\nu}{\nu-2}\bSigma. 
\end{aligned}
\end{equation*}
Note that the $\bSigma$ is called the scale matrix since it is not exactly the covariance matrix as that in a multivariate Gaussian distribution. 

Specifically, When $D=1$, it follows that
\begin{equation}\label{equation:uni-stu-nonzero}
\begin{aligned}
\tau(x\mid \mu, \sigma^2, \nu)&= \frac{\Gamma(\frac{\nu+1}{2})}{\Gamma(\frac{\nu}{2})} \frac{1}{\sigma\sqrt{\nu\pi}} \times \left[ 1+ \frac{(x-\mu)^2}{\nu \sigma^2}   \right]^{-(\frac{\nu+1}{2})}.
\end{aligned}
\end{equation}
When $D=1, \bmu=0, \bSigma=1$, then the p.d.f. defines the \textit{univariate $t$ distribution}.
\begin{equation*}
\begin{aligned}
\tau(x\mid \nu)&= \frac{\Gamma(\frac{\nu+1}{2})}{\Gamma(\frac{\nu}{2})} \frac{1}{\sqrt{\nu\pi}} \times \left[ 1+ \frac{x^2}{\nu }   \right]^{-(\frac{\nu+1}{2})}.
\end{aligned}
\end{equation*}
\end{definition}
Figure~\ref{fig:studentt_densitys-1} compares the Gaussian and the Student's $t$ distribution for various values such that when $\nu\rightarrow \infty$, the difference between the densities is approaching zero. Given the same parameters in the densities, the Student's $t$ in general has longer ``tails" than a Gaussian, which can be seen from the comparison between Figure~\ref{fig:gauss-diagonal} and Figure~\ref{fig:student-1}. 
This provides the Student's $t$ distribution an important property known as \textbf{robustness}, which means that it is much less sensitive than the Gaussian in the presence of  outliers \citep{bishop2006pattern, murphy2012machine}.

\begin{figure}[h]
	\subfigure[Gaussian, $\bSigma =\begin{bmatrix}
		1&0\\
		0&1
	\end{bmatrix}. $ ]{\includegraphics[width=0.31
		\textwidth]{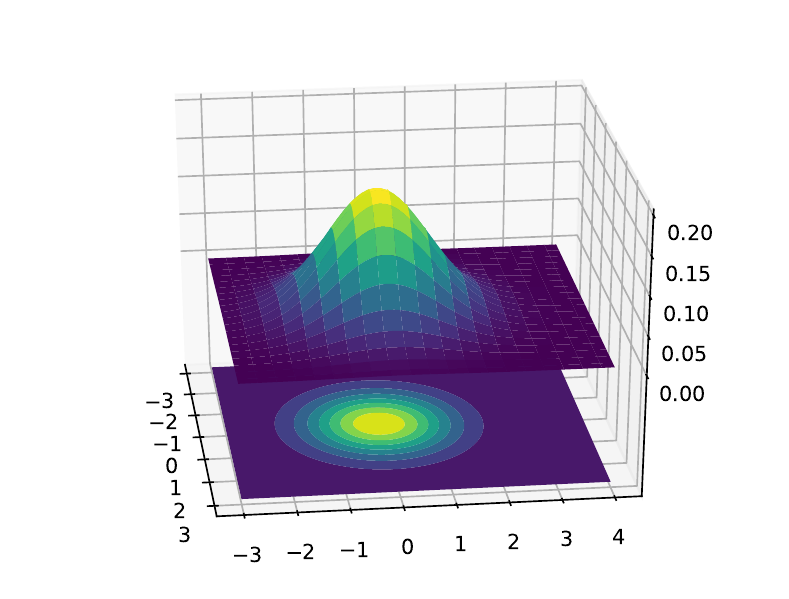} \label{fig:gauss-diagonal}}
	\subfigure[Gaussian, $\bSigma =\begin{bmatrix}
		1&0\\
		0&3
	\end{bmatrix}.$]{\includegraphics[width=0.31
		\textwidth]{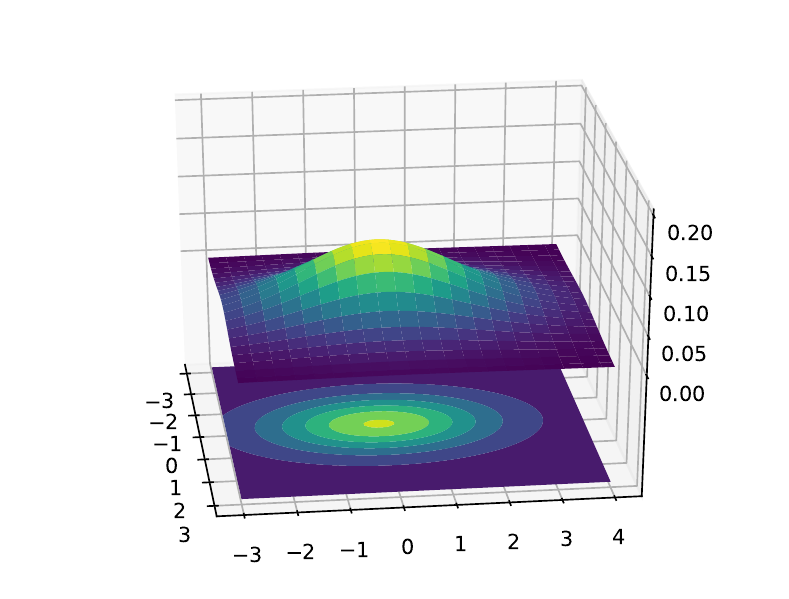} \label{fig:gauss-spherical}}
	\subfigure[Gaussian, $\bSigma =\begin{bmatrix}
		1&\textendash0.5\\
		\textendash0.5&1.5
	\end{bmatrix}.$]{\includegraphics[width=0.31 
		\textwidth]{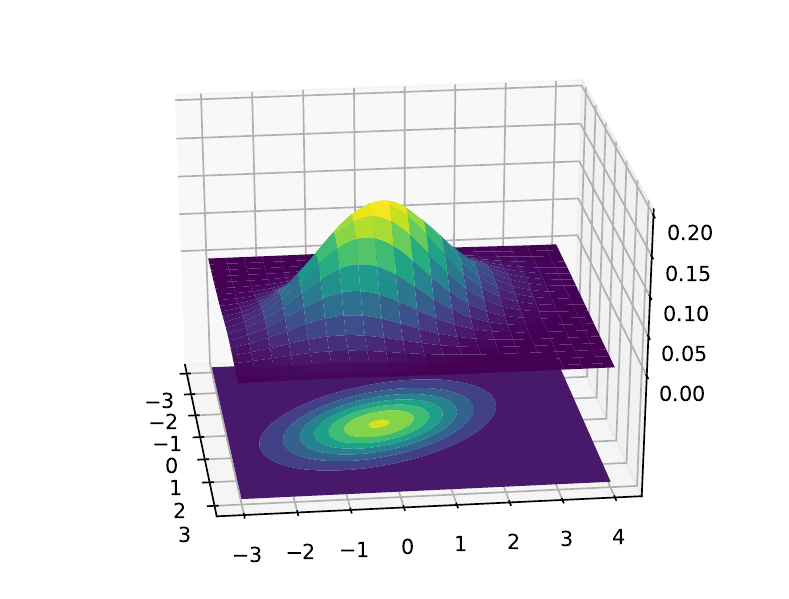} \label{fig:gauss-full}}
	\subfigure[Student $t$, $\bSigma =\begin{bmatrix}
		1&0\\
		0&1
	\end{bmatrix}, \nu=1. $ ]{\includegraphics[width=0.31
		\textwidth]{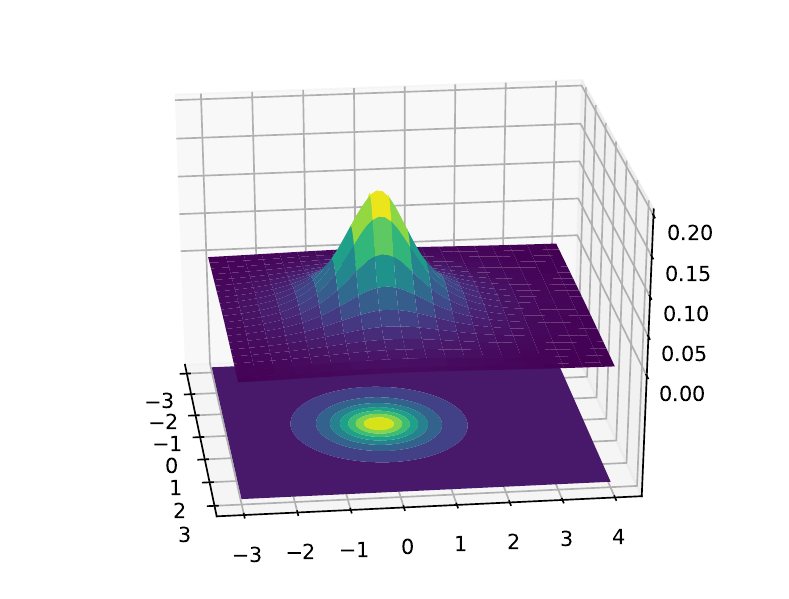} \label{fig:student-1}}
	\subfigure[Student $t$, $\bSigma =\begin{bmatrix}
		1&0\\
		0&1
	\end{bmatrix}, \nu=3. $ ]{\includegraphics[width=0.31
		\textwidth]{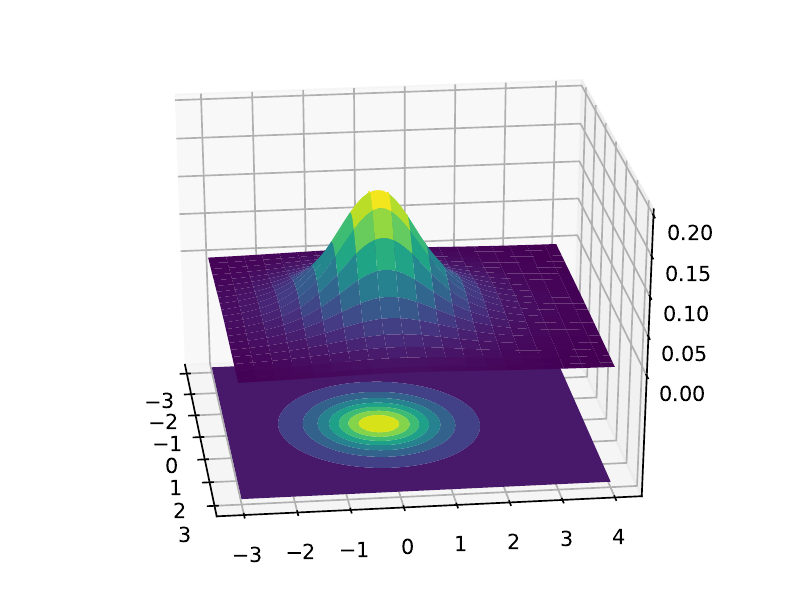} \label{fig:student-3}}
	\subfigure[Stu $t$, $\bSigma =\begin{bmatrix}
		1&0\\
		0&1
	\end{bmatrix}, \nu=200. $ ]{\includegraphics[width=0.31
		\textwidth]{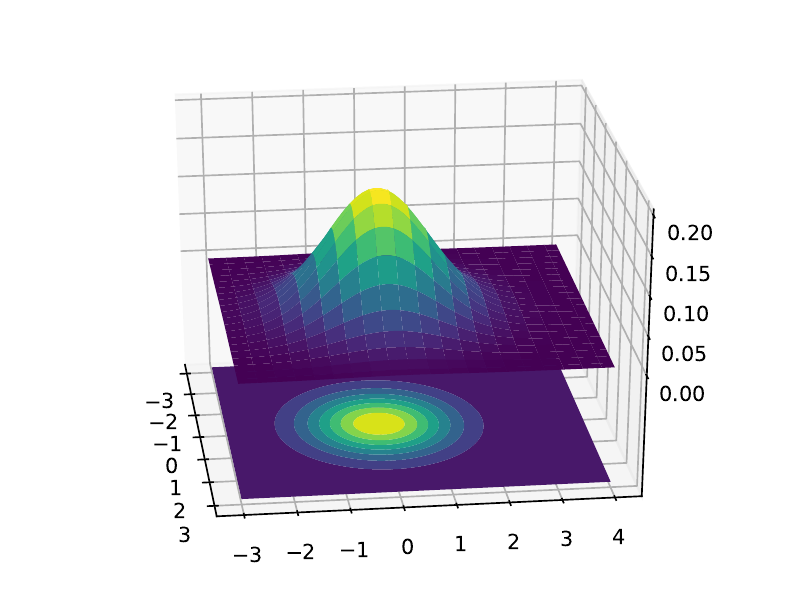} \label{fig:student200}}
	\subfigure[Diff between (a) and (d)]{\includegraphics[width=0.31
		\textwidth]{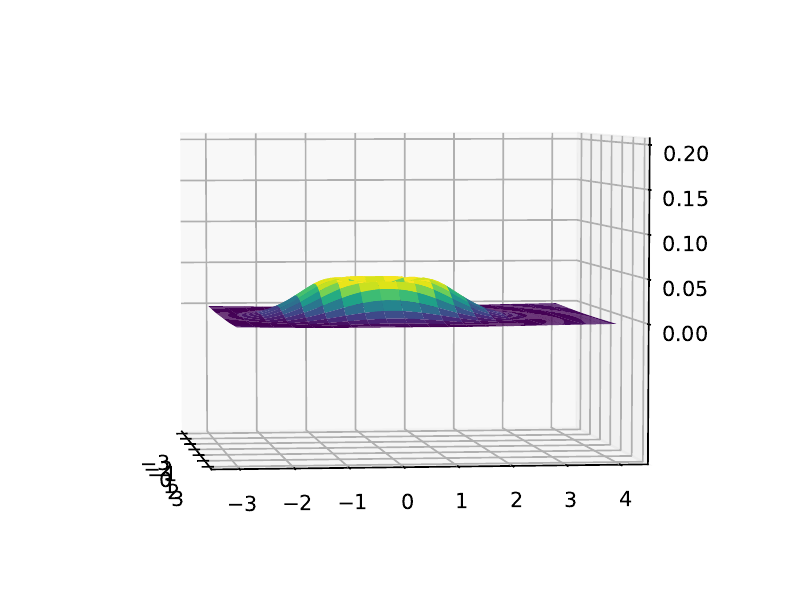} \label{fig:gauss-stu-diff1}}
	\subfigure[Diff between (a) and (e)]{\includegraphics[width=0.31
		\textwidth]{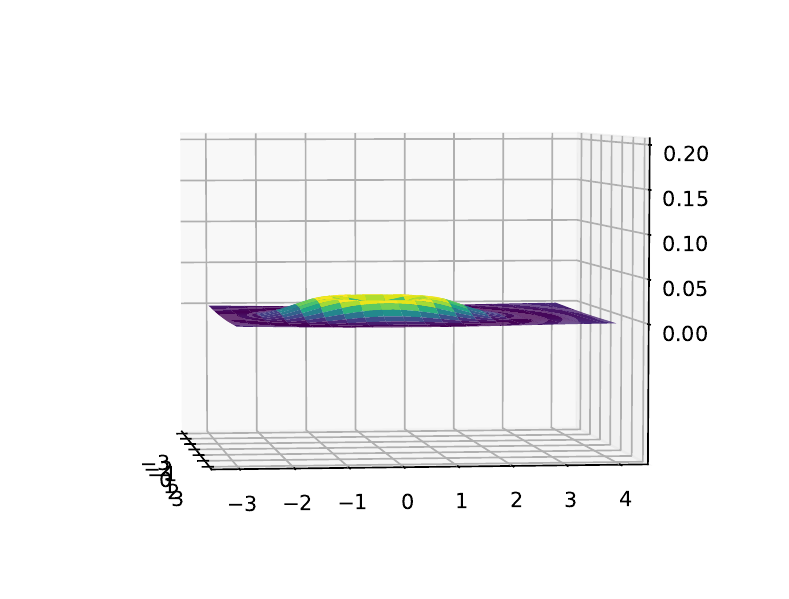} \label{fig:gauss-stu-diff3}}
	\subfigure[Diff between (a) and (f)]{\includegraphics[width=0.31
		\textwidth]{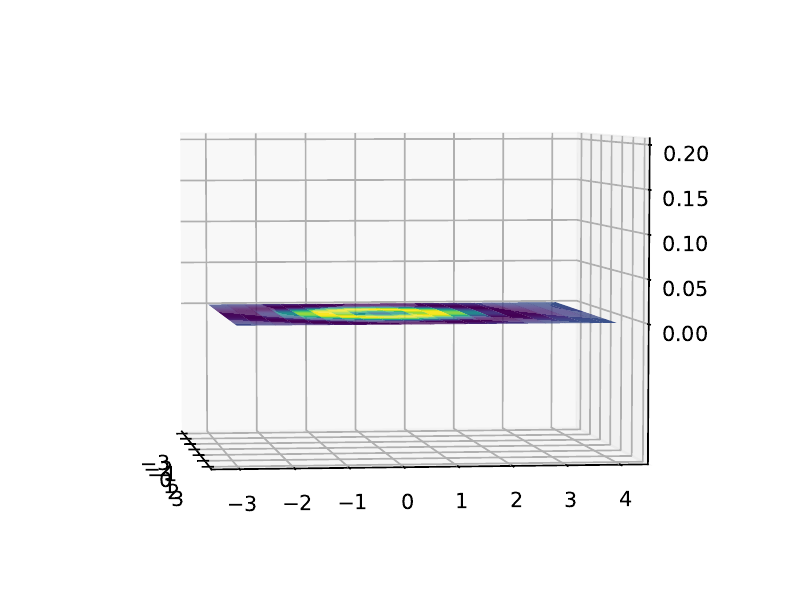} \label{fig:gauss-stu-diff200}}
	\centering
	\caption{Density and contour plots (\textcolor{mydarkblue}{blue}=low, \textcolor{mydarkyellow}{yellow}=high) for the multivariate Gaussian distribution and multivariate Student's $t$ distribution over the $\mathbb{R}^2$ space for various values of the covariance/scale matrix with zero-mean vector.  
		Fig~\ref{fig:gauss-diagonal}: A spherical covariance matrix has a circular shape; 
		Fig~\ref{fig:gauss-spherical}: A diagonal covariance matrix is an \textit{axis aligned} ellipse; 
		Fig~\ref{fig:gauss-full}: A full covariance matrix has a elliptical shape; \\
		Fig~\ref{fig:student-1} to Fig~\ref{fig:student200} for the Student's $t$ distribution with the same scale matrix and increasing $\nu$ such that the difference between (a) and (f) in Fig~\ref{fig:gauss-stu-diff200} is approaching  zero.}\centering
	\label{fig:studentt_densitys-1}
\end{figure}

A Student's $t$ distribution can be written as a \textit{Gaussian scale mixture}
\begin{equation}\label{equation:gauss-scale-mixture}
	\tau(\bx\mid \bmu, \bSigma, \nu) 
	= \int_0^{\infty} \normal(\bx \mid \bmu, \bSigma/z)\cdot \gammadist\big(z\mid \frac{\nu}{2}, \frac{\nu}{2}\big)
	dz.
\end{equation}
This can be thought of as an ``infinite'' mixture of Gaussians, each with a slightly different covariance matrix. In other words, a Student's $t$ distribution is obtained by adding up an
infinite number of Gaussian distributions having the same mean vector but different covariance matrices. 
From this Gaussian scale mixture view, when $\nu \rightarrow \infty$, the Gamma distribution becomes a degenerate random variable with all the nonzero mass at the point unity such that the multivariate Student's $t$ distribution converges to a multivariate Gaussian distribution.

\paragrapharrow{Affine transformations of Student's $t$.}
Similar to the multivariate Gaussian distribution, the affine transformation of a Student's $t$ also follows another Student's $t$. 
Suppose $\rvx\sim \studentt(\bmu, \bSigma, \nu)$ (of length $D$) and given a  fixed matrix $\bA\in\real^{P\times D}$ and a fixed vector $\bb\in\real^P$. Then it follows that 
\begin{equation}
	\bA\rvx \sim \studentt(\bA\bmu+\bb, \bA\bSigma\bA^\top, \nu).
\end{equation}
Therefore, we can sample $\rvx\sim \studentt(\bmu, \bSigma, \nu)$ by sampling $\rvy\sim\studentt(\bzero, \bI, \nu)$ and letting $\rvx=\bmu+\bL\rvy$, where $\bSigma=\bL\bL^\top$ is the Cholesky decomposition of $\bSigma$ (Theorem~\ref{theorem:cholesky-factor-exist}).

\paragrapharrow{Marginal and conditional distributions of Student's $t$.}
Similar to the multivariate Gaussian distribution, let $\rvx$ and $\rvy$ be jointly Student's $t$ random vectors with 
$$
\rvz=
\begin{bmatrix}
	\rvx\\
	\rvy 
\end{bmatrix}
\sim 
\studentt\left(
\begin{bmatrix}
	\bmu_x\\
	\bmu_y 
\end{bmatrix}
,
\begin{bmatrix}
	\bA & \bC\\
	\bC^\top & \bB
\end{bmatrix} , \nu
\right)=
\studentt\left(
\begin{bmatrix}
	\bmu_x\\
	\bmu_y 
\end{bmatrix}
,
\begin{bmatrix}
	\widetildebA & \widetildebC\\
	\widetildebC^\top & \widetildebB
\end{bmatrix}^{-1}, \nu
\right),
$$
where $\rvx\in\real^{d_x}$ and $\rvy\in\real^{d_y}$.
Then every marginal distribution of a Student's $t$ distribution is itself a Student's $t$ distribution, and the conditional distribution $\rvx\mid \rvy$ also follows a Student's $t$ distribution:
\begin{equation}
	\begin{aligned}
		\rvx
		\sim\studentt(\bmu_x,\bA, \nu),
		\gap 
		\rvx\mid \rvy=\by 
		&\sim \studentt(\bmu_x+\bC\bB^{-1}(\by-\bmu_y), m_x(\bA-\bC\bB^{-1}\bC^\top), \nu+d_x)\\
		&=\studentt(\bmu_x-\widetildebA^{-1}\widetildebC(\by-\bmu_y), m_x\widetildebA^{-1}, \nu+d_x);\\
		\rvy
		\sim\studentt(\bmu_y,\bB, \nu),
		\gap 
		\rvy\mid \rvx=\bx
		&\sim \studentt(\bmu_y+\bC^\top\bA^{-1}(\bx-\bmu_x), m_y(\bB-\bC^\top\bA^{-1}\bC), \nu+d_y)\\
		&=\studentt(\bmu_y-\widetildebB^{-1}\widetildebC^\top(\bx-\bmu_x), m_y\widetildebB^{-1}, \nu+d_y),\\
	\end{aligned}
\end{equation}
where 
$$
\begin{aligned}
	m_x &= \frac{1}{\nu+d_y}\left[\nu+ (\by-\bmu_y)^\top \bB^{-1} (\by-\bmu_y)\right];\\
	m_y &= \frac{1}{\nu+d_x}\left[\nu+ (\bx-\bmu_x)^\top \bA^{-1} (\bx-\bmu_x)\right].
\end{aligned}
$$


\subsection*{Wishart Distribution and Variants}
A generalization to the inverse-Gamma distribution (Definition~\ref{definition:inverse_gamma_distribution}) is the \textit{inverse-Wishart} distribution, serving as a conjugate prior for the full covariance matrix of a multivariate Gaussian distribution. 
That is, the inverse-Wishart distribution is a probability distribution of random positive definite matrices that can be used to model random covariance matrices.

Before delving into the topic of the inverse-Wishart distribution, it's important to note that it originates from the Wishart distribution, a multidimensional generalization of the Gamma distribution. As stated by \citet{anderson1962introduction} in 1962,  ``\textit{The Wishart distribution ranks next to the (multivariate) normal distribution in order of importance and usefulness in multivariate statistics.}"

\index{Wishart distribution}
\begin{definition}[Wishart distribution]\label{definition:wishart_dist}
A random symmetric positive definite matrix $\bLambda\in \real^{D\times D}$ is said to follow the \textit{Wishart distribution} with parameter $\bM\in\real^{D\times D}$ and $\nu$, denoted by $\bLambda \sim \wishartdist(\bM, \nu)$, if
$$
\small
\begin{aligned}
f(\bLambda;\textcolor{black}{\bM}, \nu)
&= \abs{\bLambda}^{\textcolor{black}{\frac{\nu-D-1}{2}}} \exp\left\{-\frac{1}{2}\tr(\textcolor{black}{\bLambda} \textcolor{black}{\bM^{-1}})\right\}
\left[2^{\frac{\nu D}{2}}  \pi^{D(D-1)/4}  \textcolor{black}{|\bM|^{\nu/2}  } \prod_{d=1}^D\Gamma\big(\frac{\nu+1-d}{2}\big) \right]^{-1},
\footnote{In some texts, the density function is defined using the generalized Gamma function: $\Gamma_d(x)=\pi^{d(d-1)/4}\prod_{i=1}^{d}\Gamma(\frac{2x+1-i}{2})$, such that 
$$
f(\bLambda;\textcolor{black}{\bM}, \nu)\\
= \abs{\bLambda}^{\textcolor{black}{\frac{\nu-D-1}{2}}} \exp\left\{-\frac{1}{2}\tr(\textcolor{black}{\bLambda} \textcolor{black}{\bM^{-1}})\right\}
\left[2^{\frac{\nu D}{2}}    \textcolor{black}{|\bM|^{\nu/2}  } \Gamma_D(\nu/2) \right]^{-1}.
$$
}
\end{aligned}
$$
where $\nu \geq D$ and $\bM$ is a $D\times D$ symmetric positive definite matrix, and $\abs{\bLambda} = \det(\bLambda)$ is the determinant of matrix $\bLambda$.
The parameter $\nu$ is called the \textit{number of degrees of freedom}, and $\bM$ is called the \textit{scale matrix}.
The mean and variance of the Wishart distribution are given by 
$$
\begin{aligned}
\Exp [\bLambda] &= \nu \bM \qquad \text{and}\qquad 
\Var[\lambda_{ij}] = \nu (m_{ij}^2 + m_{ii}m_{jj}),
\label{equation:wishart_expectation}
\end{aligned}
$$
where $m_{ij}$ is the ($i,j$)-th element of $\bM$.
It can be shown that when $\nu\rightarrow \infty$, then $\bLambda/\nu$ converges in probability to $\bM$ (using law of large numbers and the Cram\'er-Wold device).

When $D=1$ and $\bM=1$, the Wishart distribution reduces to the Chi-squared distribution (Definition~\ref{definition:chisquare_dist}) such that:
$$
\wishartdist(x\mid  1, \nu) = \chisquared(x\mid \nu).
$$
\end{definition}

An interpretation of the Wishart distribution is as follows. Suppose we independently sample vectors $\bz_1, \bz_2, \ldots, \bz_{\nu}\in\real^D$ from $\normal(\bzero, \bM)$. The sum of squares matrix of the collection of multivariate vectors is given by 
$$
\sum_{i=1}^{\nu} \bz_i\bz_i^\top = \bZ^\top\bZ,
$$
where $\bZ$ is the $\nu \times D$ matrix with $i$-th row being $\bz_i$. It is evident that $\bZ^\top\bZ$ is positive semidefinite (PSD) and symmetric. If $\nu >D$ and the $\bz_i$'s are linearly independent, then $\bZ^\top \bZ$ will be positive definite (PD) and symmetric. 
In other words, $\bZ\bx=\bzero$ only happens when $\bx=\bzero$. We can repeat over and over again, generating matrices $\bZ_1^\top\bZ_1, \bZ_2^\top\bZ_2, \ldots, \bZ_l^\top\bZ_l$. The population distribution of these matrices follows a Wishart distribution with parameters $(\bM, \nu)$. By definition, 
$$
\begin{aligned}
\bLambda&=\bZ^\top\bZ = \sum_{i=1}^{\nu} \bz_i\bz_i^\top 
\qquad\implies\qquad
\Exp[\bLambda]=\Exp[\bZ^\top\bZ] = \Exp\left[\sum_{i=1}^{\nu} \bz_i\bz_i^\top\right] = \nu \Exp[\bz_i\bz_i^\top] = \nu\bM. \\
\end{aligned}
$$

When $D=1$, this reduces to the case that if $z$ is drawn from a zero-mean univariate normal random variable, then $z^2$ is drawn from a Gamma random variable. To be specific, 
$$
\mathrm{suppose } \qquad  z \sim \normal(0, a) , \qquad \mathrm{then } \qquad z^2\sim \gammadist(a/2, 1/2).
$$

\begin{remark}[Properties of Wishart distribution]
We present several properties of the Wishart distribution without providing their proofs:
\begin{itemize}
\item \textit{``Decorrelation."} Suppose $\bLambda\sim\wishartdist(\bM, \nu)$ with $\bLambda\in\real^{D\times D}$. 
Then, it follows that $\bM^{-1/2}\bLambda\bM^{-1/2}\sim \wishartdist(\nu, \bI_D)$.
\item \textit{Quadratic transformation.} Suppose $\bLambda\sim\wishartdist(\bM, \nu)$ with $\bLambda\in\real^{D\times D}$ and $\bA\in\real^{P\times D}$. 
Then, it follows that  $\bA\bLambda\bA^\top\sim \wishartdist(\bA\bM\bA^\top, \nu)$.
\item Suppose $\bLambda\sim\wishartdist(\bM, \nu)$ with $\bLambda\in\real^{D\times D}$, $\ba\in\real^D$, and $\nu>D-1$. 
Then, it follows that $\frac{\ba^\top\bM^{-1}\ba}{\ba^\top\bLambda^{-1}\ba}\sim \chisquared_{(\nu-D-1)}$.
\item Suppose $\bLambda\sim\wishartdist(\bM, \nu)$ with $\bLambda\in\real^{D\times D}$ and $\ba\in\real^D$. 
Then, it follows that $\frac{\ba^\top\bLambda\ba}{\ba^\top\bM\ba}\sim \chisquared_{(\nu)}$.
\item \textit{\text{Sum of independent Wisharts}.} Given independent random matrices $\bLambda_i\sim\wishartdist(\bM, \nu_i)$ with $\nu=\sum_{i}\nu_i$. 
Then, it follows that $\sum_{i}\bLambda_i\sim \wishartdist(\bM, \nu)$.
\item \textit{\text{Sum of independent Wisharts}.} Similarly, given independent random matrices $\bLambda\sim\wishartdist(\bM, \nu)$ and $\bLambda_1\sim\wishartdist(\bM, \nu_1)$. 
Then, it follows that $\bLambda_2=\bLambda-\bLambda_1\sim \wishartdist(\bM,\nu-\nu_1)$.
\item \textit{``Standardization".} Suppose $\{\bx_i,\bx_2,\ldots,\bx_n\}$ are random samples of $\normal(\bmu,\bSigma)$, let $\overline{\bx} = \frac{1}{n} \sum_{i=1}^{n} \bx_i $ and $\bS=\frac{1}{n-1}\sum_{i=1}^{n}(\bx_i-\overline{\bx})(\bx_i-\overline{\bx})^\top$.
Then, it follows that $(n-1)\bS\sim \wishartdist(\bSigma,n-1)$. And it can be shown that $\overline{\bx}$ and $\bS$ are independent (the distribution of $\overline{\bx}$ is shown in \eqref{equation:mean_mutigau}). 
\end{itemize}
\end{remark}

Just like the relationship between the inverse-Gamma distribution and the Gamma distribution that if $x \sim \gammadist(r, \lambda)$, then $y=\frac{1}{x} \sim \inversegammadist(r, \lambda)$. There is a similar connection between the   inverse-Wishart distribution and the Wishart distribution.

Since we often use the inverse-Wishart (IW) distribution as a prior distribution for a covariance matrix, it is often useful to replace $\bM$ in the Wishart distribution with $\bS=\bM^{-1}$. This results in that
a random $D\times D$ symmetric positive definite matrix $\bSigma$ follows an inverse-Wishart $\inversewishart(\bSigma\mid \bS, \nu)$ distribution if $\bSigma^{-1}=\bLambda$ follows a Wishart $\mathrm{Wi}(\bLambda\mid \bM, \nu)$ distribution. 
\begin{definition}[Inverse-Wishart distribution]\label{definition:multi_inverse_wishart}
A random symmetric positive definite matrix $\bSigma\in \real^{D\times D}$ is said to follow the \textit{inverse-Wishart distribution} with parameters $\bS\in\real^{D\times D}$ and $\nu$, denoted by $\bSigma\sim \inversewishart(\bS, \nu)$, if
$$
\small
\begin{aligned}
f(\bSigma; \textcolor{winestain}{\bS}, \nu)
= \abs{\bSigma}^{\textcolor{mydarkblue}{-\frac{\nu+D+1}{2}}} \exp\left\{-\frac{1}{2}\tr(\textcolor{mydarkblue}{\bSigma^{-1}} \textcolor{winestain}{\bS})\right\}
\times \left[2^{\frac{\nu D}{2}}  \pi^{D(D-1)/4}  \textcolor{winestain}{|\bS|^{-\nu/2}}   \prod_{d=1}^D\Gamma\big(\frac{\nu+1-d}{2}\big) \right]^{-1},
\end{aligned}
$$
where $\nu \geq D$, $\bS$ is a $D\times D$ symmetric positive definite matrix, and $\abs{\bSigma} = \det(\bSigma)$.
The parameter $\nu$ is called the \textit{number of degrees of freedom}, and $\bS$ is called the \textit{scale matrix}.
The mean and mode of the inverse-Wishart distribution are given by 
$$
\begin{aligned}
\Exp [\bSigma ^{-1}] &= \nu \bS^{-1}=\nu \bM, \qquad 
\Exp [\bSigma] = \frac{1}{\nu - D - 1} \bS, \qquad \text{and}\qquad
\mathrm{Mode}[\bSigma] = \frac{1}{\nu + D + 1} \bS.
\label{equation:iw_expectation}
\end{aligned}
$$
Note that, sometimes, we replace $\bS$ by $\bM=\bS^{-1}$ such that $\Exp [\bSigma ^{-1}] = \nu \bM$, which does not involve the inverse of the matrix.

When $D=1$, the inverse-Wishart distribution reduces to the inverse-Gamma such that $\frac{\nu}{2} = r$ and $\frac{S}{2}=\lambda$ (see Definition~\ref{definition:inverse_gamma_distribution}):
$$
\inversewishart(y\mid S, \nu) = \inversegammadist(y\mid r, \lambda).
$$
\end{definition}

Note that the Wishart density is not simply the inverse-Wishart density with
$\bSigma$ replaced by $\bLambda = \bSigma^{-1}$. There is an additional factor of $\abs{\bSigma}^{-(D+1)}$. See  Theorem 7.7.1 in \citet{anderson1962introduction} that the Jacobian of the transformation $\bLambda = \bSigma^{-1}$ is $\abs{\bSigma}^{-(D+1)}$. Substitution of $\bSigma^{-1}$ in the definition of the Wishart distribution and multiplying by $\abs{\bSigma}^{-(D+1)}$ can yield the inverse-Wishart distribution.
\footnote{Which is from the Jacobian in the change-of-variables formula. A short proof is provided here. Let $\bLambda = g(\bSigma)=\bSigma^{-1}$, where $\bSigma\sim \inversewishart(\bS, \nu)$ and $\bLambda\sim \wishartdist(\bS, \nu)$. Then, $f(\bSigma)  = f(\bLambda) |J_g|$, where $J_g$ is the Jacobian matrix,  results in $f(\bSigma) = f(\bLambda) |J_g| = f(\bLambda)\abs{\bSigma}^{-(D+1)} $. }

\begin{example}[Wishart and inverse-Wishart]\label{example:wis_invwish}
Consider a random matrix $\rmG \in \real^{\nu \times m}$, each row of which is drawn independently from the distribution $\normal(\bzero, \bM)$, where $\bM \in \real^{m \times m}$ is symmetric positive definite. 
Then, as mentioned above, the probability distribution of the $m \times m$ random matrix $\rmG^\top \rmG$ is called the Wishart distribution with $\nu$ degrees of freedom, denoted as $\wishartdist(\bM, \nu)$. Moreover, the distribution of the matrix $(\rmG^\top \rmG)^{-1}$ is called the inverse Wishart distribution and is denoted by $\inversewishart(\bM^{-1}, \nu)$. 

Now suppose that $\rmG \in \real^{\nu \times m}$ , each entry $\rg_{ij}$ of which is drawn independently from $\normal(0, \frac{1}{m} )$. Then, the matrix $\rmG^\top \rmG$ follows the Wishart distribution $\wishartdist(\frac{1}{m} \bI_m, \nu)$ that satisfies:
\begin{equation}
\Exp \left[\trace(\rmG^\top \rmG)^{-1}\right] = \frac{m^2}{\nu - m - 1},
\end{equation}
since $(\rmG^\top \rmG)^{-1} \sim \inversewishart(m \bI_m, \nu)$.
\end{example}

\section{Matrix Decomposition}
This section introduces several matrix decomposition methods, which can be instrumental in proving theories related to linear models or least squares models.
\index{Positive definite}
\index{Positive definiteness}
\subsection{Cholesky Decomposition}\label{section:choleskydecomp}
Positive definiteness or positive semidefiniteness (Definition~\ref{definition:psd-pd-defini}) is one of the most desirable properties a matrix can have. In this section, we introduce decomposition techniques for positive definite matrices, with a focus on the well-known \textit{Cholesky decomposition}.
The Cholesky decomposition is named after a French military officer and mathematician, Andr\'{e}-Louis Cholesky (1875{\textendash}1918), who developed this method in his surveying work. It is primarily used to solve linear systems involving positive definite matrices.

To establish the existence of the Cholesky decomposition, we rely on the well-known Sylvester's criterion.
\begin{theoremHigh}[Sylvester's criterion]\label{sylvester1}\index{Sylvester's criterion}
Let $\bA_k=\bA[1:k, 1:k] \in \real^{k \times k}$, $k = 1,2,\ldots, n$, be the leading principal submatrices of the symmetric matrix $\bA \in \real^{n\times n}$. Then $\bA$ is positive definite if and only if $\det(\bA_k) > 0$, $k = 1, 2, \ldots, n$.
\end{theoremHigh}
The proof can be found, for example, in \citet{lu2021numerical}.

Here, we establish the existence of the Cholesky decomposition using an inductive approach. Alternative proofs also exist, such as those derived from the LU decomposition  \citep{lu2022matrix}.

\index{Cholesky decomposition}
\begin{theoremHigh}[Cholesky decomposition]\label{theorem:cholesky-factor-exist}
Every positive definite (PD) matrix $\bA\in \real^{n\times n}$ can be factored as 
$$
\bA = \bR^\top\bR,
$$
where $\bR \in \real^{n\times n}$ is an upper triangular matrix \textbf{with positive diagonal elements}. This decomposition is called the \textit{Cholesky decomposition}  of $\bA$, and $\bR$ is known as the \textit{Cholesky factor} or \textit{Cholesky triangle} of $\bA$.
Specifically, the Cholesky decomposition is unique (Corollary~\ref{corollary:unique-cholesky-main}).
\end{theoremHigh}
Alternatively, $\bA$ can be factored as $\bA=\bL\bL^\top$, where $\bL=\bR^\top$ is a lower triangular matrix \textit{with positive diagonals}.
\begin{proof}[of Theorem~\ref{theorem:cholesky-factor-exist}]
We will prove by induction that every $n\times n$ positive definite matrix $\bA$ has a decomposition $\bA=\bR^\top\bR$. The $1\times 1$ case is trivial by setting $R\triangleq\sqrt{A}$, so that $A=R^2$. 

Suppose any $k\times k$ PD matrix $\bA_k$ has a Cholesky decomposition. We must show that any $(k+1)\times(k+1)$ PD matrix $\bA_{k+1}$ can also be factored as this Cholesky decomposition, then we complete the proof.

For any $(k+1)\times(k+1)$ PD matrix $\bA_{k+1}$, write $\bA_{k+1}$ as
$
\bA_{k+1} \triangleq \footnotesize\begin{bmatrix}
\bA_k & \bb \\
\bb^\top & d
\end{bmatrix}.
$
Since $\bA_k$ is PD, by the inductive hypothesis, it admits a Cholesky decomposition  $\bA_k = \bR_k^\top\bR_k$. Define the upper triangular matrix
$
\bR_{k+1}\triangleq\footnotesize\begin{bmatrix}
\bR_k & \br\\
0 & s
\end{bmatrix}.
$
Then,
$$
\bR_{k+1}^\top\bR_{k+1} = 
\begin{bmatrix}
\bR_k^\top\bR_k & \bR_k^\top \br\\
\br^\top \bR_k & \br^\top\br+s^2
\end{bmatrix}.
$$
Therefore, if we can prove $\bR_{k+1}^\top \bR_{k+1} = \bA_{k+1}$ is the Cholesky decomposition of $\bA_{k+1}$ (which requires the value $s$ to be positive), then we complete the proof. That is, we need to prove
$$
\begin{aligned}
\bb &= \bR_k^\top \br 
\qquad \text{and}\qquad 
d = \br^\top\br+s^2.
\end{aligned}
$$
Since $\bR_k$ is nonsingular, we can solve uniquely for $\br$ and $s$: 
$$
\begin{aligned}
\br &= \bR_k^{-\top}\bb
\qquad \text{and}\qquad 
s = \sqrt{d - \br^\top\br} = \sqrt{d - \bb^\top\bA_k^{-1}\bb},
\end{aligned}
$$
where we assume $s$ is nonnegative. However, we need to further prove that $s$ is not only nonnegative, but also positive. Since $\bA_k$ is PD, from Sylvester's criterion (Theorem~\ref{sylvester1}), and the fact that if matrix $\bM$ has a block formulation: $\bM=\begin{bmatrixfoot}
\bA & \bB \\
\bC & \bD 
\end{bmatrixfoot}$, then $\det(\bM) = \det(\bA)\det(\bD-\bC\bA^{-1}\bB)$, we have
$$
\det(\bA_{k+1}) = \det(\bA_k)\det(d- \bb^\top\bA_k^{-1}\bb) =  \det(\bA_k)(d- \bb^\top\bA_k^{-1}\bb)>0.
$$
Because $ \det(\bA_k)>0$, we then obtain that $(d- \bb^\top\bA_k^{-1}\bb)>0$, and this implies $s>0$.
This completes the proof.
\end{proof}

\index{Uniqueness}
\begin{corollary}[Uniqueness of Cholesky decomposition\index{Uniqueness}]\label{corollary:unique-cholesky-main}
The Cholesky decomposition $\bA=\bR^\top\bR$ for any positive definite matrix $\bA\in \real^{n\times n}$ is unique.
\end{corollary}
\begin{proof}[of Corollary~\ref{corollary:unique-cholesky-main}]
Suppose the Cholesky decomposition is not unique. Then, there exist two distinct decompositions such that $\bA=\bR_1^\top\bR_1 = \bR_2^\top\bR_2$.  Rearranging, we obtain
$$
\bR_1\bR_2^{-1}= \bR_1^{-\top} \bR_2^\top.
$$
Since the inverse of an upper triangular matrix is also upper triangular, and the product of two upper triangular matrices is upper triangular, \footnote{Similarly, the inverse of a lower triangular matrix is lower triangular, and the product of two lower triangular matrices is also lower triangular.} we conclude that the left-hand side of the above equation is an upper triangular matrix, while the right-hand side is a lower triangular matrix. Consequently, $\bR_1\bR_2^{-1}= \bR_1^{-\top} \bR_2^\top$ must be a diagonal matrix, and $\bR_1^{-\top} \bR_2^\top= (\bR_1^{-\top} \bR_2^\top)^\top = \bR_2\bR_1^{-1}$.
Let $\bLambda \triangleq \bR_1\bR_2^{-1}= \bR_2\bR_1^{-1}$ be the diagonal matrix. We notice that the diagonal value of $\bLambda$ is the product of the corresponding diagonal values of $\bR_1$ and $\bR_2^{-1}$ (or $\bR_2$ and $\bR_1^{-1}$). Explicitly, writing the matrices as
$$
\bR_1=
\small
\begin{bmatrix}
r_{11} & r_{12} & \ldots & r_{1n} \\
0 & r_{22} & \ldots & r_{2n}\\
\vdots & \vdots & \ddots & \vdots\\
0 & 0 & \ldots & r_{nn}
\end{bmatrix},
\qquad 
\normalsize
\bR_2=
\small
\begin{bmatrix}
s_{11} & s_{12} & \ldots & s_{1n} \\
0 & s_{22} & \ldots & s_{2n}\\
\vdots & \vdots & \ddots & \vdots\\
0 & 0 & \ldots & s_{nn}
\end{bmatrix},
$$
we find that
$$
\begin{aligned}
\bR_1\bR_2^{-1}=
\small
\begin{bmatrix}
\frac{r_{11}}{s_{11}} & 0 & \ldots & 0 \\
0 & \frac{r_{22}}{s_{22}} & \ldots & 0\\
\vdots & \vdots & \ddots & \vdots\\
0 & 0 & \ldots & \frac{r_{nn}}{s_{nn}}
\end{bmatrix}
=
\small
\begin{bmatrix}
\frac{s_{11}}{r_{11}} & 0 & \ldots & 0 \\
0 & \frac{s_{22}}{r_{22}} & \ldots & 0\\
\vdots & \vdots & \ddots & \vdots\\
0 & 0 & \ldots & \frac{s_{nn}}{r_{nn}}
\end{bmatrix}
\normalsize
=\bR_2\bR_1^{-1}.
\end{aligned}
$$ 
Since both $\bR_1$ and $\bR_2$ have positive diagonals, this implies $r_{11}=s_{11}, r_{22}=s_{22}, \ldots, r_{nn}=s_{nn}$. 
Thus, we conclude that $\bLambda = \bR_1\bR_2^{-1}= \bR_2\bR_1^{-1}  =\bI$, which implies $\bR_1=\bR_2$, contradicting our initial assumption that the decomposition is not unique. Therefore, the Cholesky decomposition is unique.
\end{proof}

We now state some useful properties of  positive definite matrices. From the Cholesky decomposition follows a well-known characterization.

\begin{theoremHigh}\label{theorem:sylt2}
Let $\bA \in \real^{n\times n}$ be symmetric, and let $\bX \in \real^{n \times p}$ have full column rank. Then $\bX^\top \bA \bX$ is positive definite. In particular, any principal $p \times p$ submatrix
$$
\begin{bmatrix}
a_{i_1 i_1} & \ldots & a_{i_1 i_p} \\
\vdots & \ddots & \vdots \\
a_{i_p i_1} & \ldots & a_{i_p i_p}
\end{bmatrix} \in \real^{p \times p}, \quad 1 \leq p < n,
$$
is positive definite. From $p = 1$ it follows that all diagonal elements in $\bA$ are real positive.
\end{theoremHigh}
\begin{proof}[of Theorem~\ref{theorem:sylt2}]
Suppose $\bA$ is positive definite, $\bbeta \neq 0$, and $\balpha = \bX \bbeta$. Then since $\bX$ has full column rank, it follows that $\balpha  \neq \bzero$ and
$ \bbeta^\top (\bX^\top \bA \bX) \bbeta = \balpha^\top \bA \balpha > 0. $
The result now follows because any principal submatrix of $\bA$ can be written as $\bX^\top \bA \bX$, where the columns of $\bX$ are taken to be the columns $k = i_j$, $j = 1, 2, \ldots, p$, of the identity matrix.
\end{proof}

\begin{corollary}\label{corollary:sylt3}
The element of maximum magnitude of a real symmetric positive definite matrix $\bA  \in \real^{n\times n}$ lies on the diagonal.
\end{corollary}
\begin{proof}[of Corollary~\ref{corollary:sylt3}]
From Theorem~\ref{theorem:sylt2} and Sylvester's criterion it follows that
\begin{equation}
\det 
\left(\begin{bmatrix}
a_{ii} & a_{ij} \\
a_{ij} & a_{jj}
\end{bmatrix}
\right) = a_{ii} a_{jj} - {a_{ij}}^2 > 0, \quad 1 \leq i, j \leq n.
\end{equation}
Hence ${a_{ij}}^2 < a_{ii} a_{jj} \leq \max_{1 \leq i \leq n} a_{ii}^2$.
\end{proof}

\index{CR decomposition}
\index{Reduced row echelon form}
\subsection{CR and Rank Decomposition}\label{section:cr-decomposition}

The CR decomposition, as introduced in \citet{strang2021every, strang2022lu}, is presented as follows without a proof; see \citet{strang2022lu} for more details.
\begin{theoremHigh}[CR decomposition]\label{theorem:cr-decomposition}
Any rank-$r$ matrix $\bX \in \real^{n \times p}$ admits the following decomposition:
$$
\underset{n\times p}{\bX} = \underset{n\times r}{\bC} \gapthree \underset{r\times p}{\bR}
$$
where $\bC$ contains the first $r$ linearly independent columns of $\bX$, and $\bR$ is an $r\times p$ matrix used to reconstruct the columns of $\bX$ from the columns of $\bC$. In particular, $\bR$ is the  reduced row  echelon form (RREF) of $\bX$ without the zero rows.

This decomposition leads to a potential reduction or increase in storage requirements, transitioning from $np$ floating-point numbers to $r(n+p)$ floating-point numbers.
\end{theoremHigh}

The CR decomposition represents a particular instance of rank decomposition. To provide a rigorous demonstration of the existence of rank decomposition, we present the following theorem.

\index{Rank decomposition}
\begin{theoremHigh}[Rank decomposition]\label{theorem:rank-decomposition}
Any rank-$r$ matrix $\bX \in \real^{n \times p}$ admits the following decomposition:
$$
\underset{n\times p}{\bX }= \underset{n\times r}{\bD}\gapthree  \underset{r\times p}{\bF},
$$
where $\bD \in \real^{n\times r}$ has rank $r$, and $\bF \in \real^{r\times p}$ also has rank $r$, i.e., $\bD$ and $\bF$ have full rank $r$.

The storage for the decomposition is then reduced or potentially increased from $np$ floating-point numbers to $r(n+p)$ floating-point numbers.
\end{theoremHigh}
\begin{proof}[of Theorem~\ref{theorem:rank-decomposition}]
By ULV decomposition in Theorem~\ref{theorem:ulv-decomposition}, we can decompose $\bX$ by 
$$
\bX = \bU \begin{bmatrix}
\bL & \bzero \\
\bzero & \bzero 
\end{bmatrix}\bV.
$$
Let $\bU_0 = \bU_{:,1:r}$ and $\bV_0 = \bV_{1:r,:}$, i.e., $\bU_0$ contains only the first $r$ columns of $\bU$, and $\bV_0$ contains only the first $r$ rows of $\bV$. Then, we still have $\bX = \bU_0 \bL\bV_0$, where $\bU_0 \in \real^{n\times r}$ and $\bV_0\in \real^{r\times p}$. This is also known as the reduced ULV decomposition. Let \{$\bD = \bU_0\bL$ and $\bF =\bV_0$\}, or \{$\bD = \bU_0$ and $\bF =\bL\bV_0$\}, we find such rank decompositions.
\end{proof}

The rank decomposition is not unique. Even by elementary transformations, we have 
$$
\bX = 
\bE_1
\begin{bmatrix}
\bZ & \bzero \\
\bzero & \bzero 
\end{bmatrix}
\bE_2,
$$
where $\bE_1 \in \real^{n\times n}, \bE_2\in \real^{p\times p}$ represent elementary row and column operations, and $\bZ\in \real^{r\times r}$. The transformation is rather general, and there are dozens of these $\bE_1$, $\bE_2$, and $\bZ$. By similar construction on this decomposition as shown in the  proof above, we can recover another rank decomposition. 

\index{Rank decomposition}
Similarly, one can obtain matrices $\bD$ and $\bF$ through methods such as SVD, URV, CR, CUR, and various other decomposition algorithms.  
However, we may connect  different rank decompositions by the following lemma.
\begin{lemma}[Connection between rank decompositions]\label{lemma:connection-rank-decom}
For any two rank decompositions of $\bX=\bD_1\bF_1=\bD_2\bF_2$, there exists a nonsingular matrix $\bP$ such that 
$$
\bD_1 = \bD_2\bP
\qquad
\text{and}
\qquad 
\bF_1 = \bP^{-1}\bF_2.
$$
\end{lemma}
\begin{proof}[of Lemma~\ref{lemma:connection-rank-decom}]
Since $\bD_1\bF_1=\bD_2\bF_2$, we have $\bD_1\bF_1\bF_1^\top=\bD_2\bF_2\bF_1^\top$. It is evident that $\rank(\bF_1\bF_1^\top)=\rank(\bF_1)=r$ such that $\bF_1\bF_1^\top$ is a square matrix with full rank and thus is nonsingular. This implies $\bD_1=\bD_2\bF_2\bF_1^\top(\bF_1\bF_1^\top)^{-1}$. Let $\bP=\bF_2\bF_1^\top(\bF_1\bF_1^\top)^{-1}$, we have $\bD_1=\bD_2\bP$ and $\bF_1 = \bP^{-1}\bF_2$. 
\end{proof}

\subsection{QR Decomposition}\label{section:qr-decomposition}

In many applications, we are interested in the column space of a matrix $\bX=[\bx_1, \bx_2, \ldots, \bx_p] \in \real^{n\times p}$. The successive spaces spanned by the columns $\bx_1, \bx_2, \ldots$ of $\bX$ are
$$
\cspace([\bx_1])\,\,\,\, \subseteq\,\,\,\, \cspace([\bx_1, \bx_2]) \,\,\,\,\subseteq\,\,\,\, \cspace([\bx_1, \bx_2, \bx_3])\,\,\,\, \subseteq\,\,\,\, \ldots,
$$
where $\cspace([\ldots])$ is the subspace spanned by the vectors included in the brackets. 
Moreover, the notion of orthogonal or orthonormal bases within the column space plays a crucial role in various algorithms, allowing for efficient computations and interpretations.
The idea of QR decomposition involves the construction of a sequence of orthonormal vectors $\bq_1, \bq_2, \ldots$ that span the same successive subspaces. That is,
$$
\cspace([\bq_1])=\cspace([\bx_1]),\qquad \cspace([\bq_1, \bq_2])=\cspace([\bx_1, \bx_2]), \qquad  \cspace([\bq_1, \bq_2, \bq_3])=\cspace([\bx_1, \bx_2, \bx_3]),   \ldots.
$$
We illustrate the result of QR decomposition in the following theorem, the proof of which will be discussed in the sequel.
\index{QR decomposition}
\begin{theoremHigh}[QR decomposition]\label{theorem:qr-decomposition-in-ls}
Every $n\times p$ matrix $\bX=[\bx_1, \bx_2, \ldots , \bx_p]$ (whether independent or dependent columns) with $n\geq p$ admits the following decomposition:
$$
\bX = \bQ\bR,
$$
where 
\begin{enumerate}
\item \textbf{Reduced}: $\bQ$ is $\textcolor{mydarkblue}{n\times p}$ with orthonormal columns, and $\bR$ is an $\textcolor{mydarkblue}{p\times p}$ upper triangular matrix, known as the \textit{reduced QR decomposition};

\item \textbf{Full}: $\bQ$ is $\textcolor{mydarkblue}{n\times n}$ with orthonormal columns, and $\bR$ is an $\textcolor{mydarkblue}{n\times p}$ upper triangular matrix,  known as the \textit{full QR decomposition}.
If we further restrict the upper triangular matrix to be a square matrix, the full QR decomposition can be denoted by 
$$
\bX = \bQ\begin{bmatrix}
\bR_0\\
\bzero
\end{bmatrix},
$$
where $\bR_0$ is an $p\times p$ upper triangular matrix. 
\end{enumerate}
Specifically, when $\bX$ has full rank, i.e., $\bX$  has linearly independent columns, $\bR$ also exhibits linearly independent columns, and $\bR$ is nonsingular in the \textit{reduced} case. 
This implies that the diagonals of $\bR$ are nonzero. Under this condition, when we further restrict that elements on the diagonal of $\bR$ to be  positive, the \textit{reduced} QR decomposition is \textbf{unique}. The \textit{full} QR decomposition is normally not unique since the right-most $(n-p)$ columns in $\bQ$ can be arranged in any order.
\end{theoremHigh}

\begin{figure}[H]
\centering  
\vspace{-0.35cm}  
\subfigtopskip=2pt  
\subfigbottomskip=2pt  
\subfigcapskip=-5pt  
\subfigure[Reduced QR decomposition.]{\label{fig:qr-comparison-reduced}
\includegraphics[width=0.47\linewidth]{./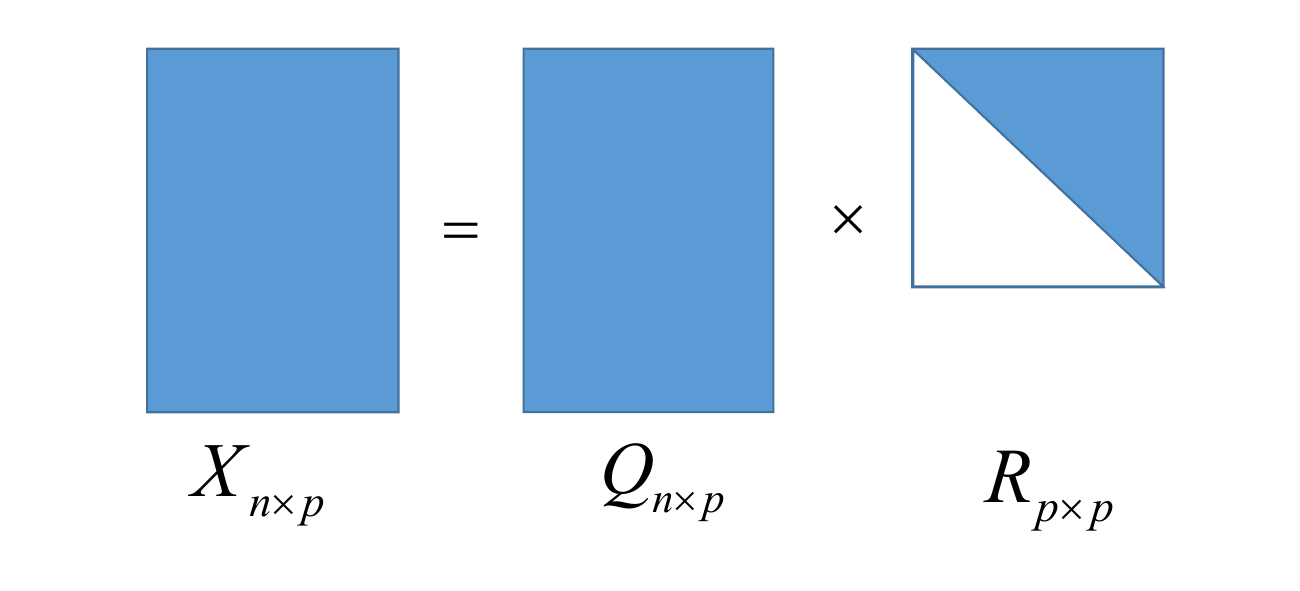}}
\quad 
\subfigure[Full QR decomposition.]{\label{fig:qr-comparison-full}
\includegraphics[width=0.47\linewidth]{./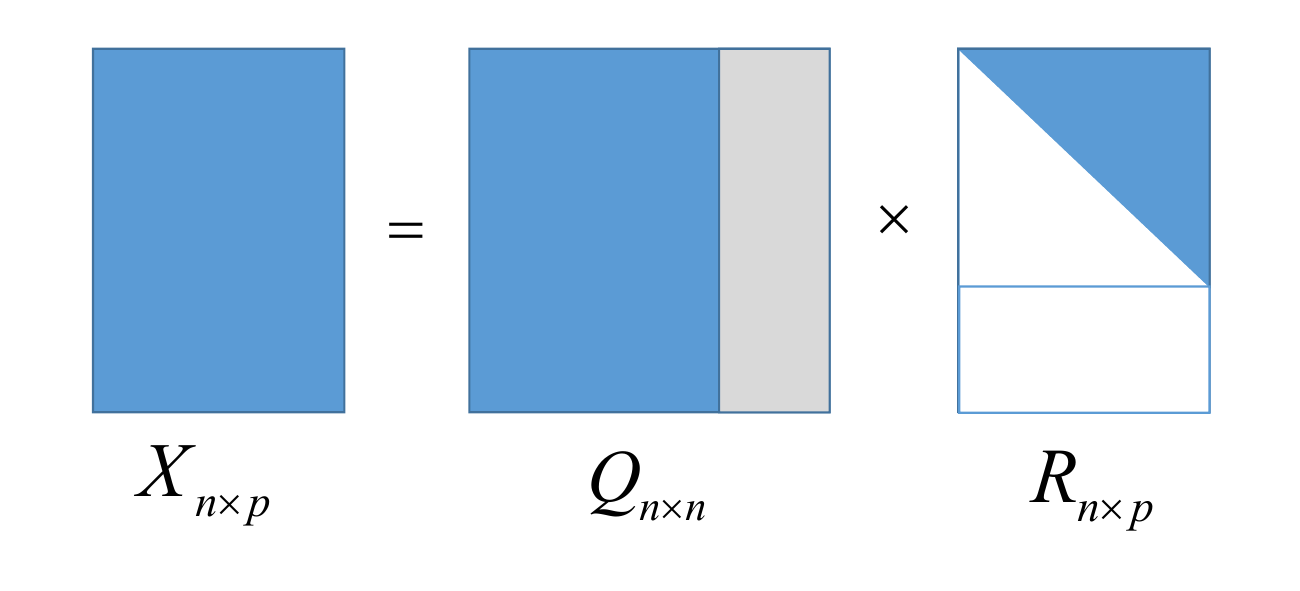}}
\caption{Comparison between the reduced and full QR decompositions. White entries are zero, and \textcolor{mylightbluetext}{blue} entries are not necessarily zero. \textcolor{mydarkgray}{Gray} columns denote silent columns.}
\label{fig:qr-comparison}
\end{figure}

If we obtain the reduced QR decomposition, a full QR decomposition of an $n\times p$ matrix with linearly independent columns goes further by appending additional $n-p$ orthonormal columns to $\bQ$, transforming it into an $n\times n$ orthogonal matrix. Simultaneously, $\bR$ is augmented with rows of zeros to attain an $n\times p$ upper triangular matrix.
We refer to the additional columns in $\bQ$ as \textit{silent columns} and the additional rows in $\bR$ as \textit{silent rows}.
The difference between the reduced and the full QR decomposition is shown in Figure~\ref{fig:qr-comparison}, where silent columns in $\bQ$ are denoted in gray, blank entries indicate zero elements, and blue entries are elements that are not necessarily zero.


\index{QR decomposition}
\index{Project vector}
\index{Orthogonal projection}
\subsection*{Project a Vector Onto Another Vector}\label{section:project-onto-a-vector}
To achieve the QR decomposition, we first discuss how to project a vector onto another vector, based on which the Gram-Schmidt process is employed iteratively.

Projecting a vector $\ba$ onto another vector $\bb$ involves determining the vector on the line of $\bb$ that is closest to $\ba$.  
The resulting projection vector, denoted as $\widehatba$, is a scalar multiple of $\bb$. 
Let $\widehatba = \widehat{x} \bb$,  then $\ba-\widehatba$ is perpendicular to $\bb$, as shown in Figure~\ref{fig:project-line}. 
This leads to the following outcome about projecting vector  $\ba$ onto vector $\bb$:
$$
\text{$\ba-\widehatba$ is perpendicular to $\bb$, so $(\ba-\widehat{x}\bb)^\top\bb=0$: $\widehat{x}$ = $\frac{\ba^\top\bb}{\bb^\top\bb}$ and $\widehatba = \frac{\ba^\top\bb}{\bb^\top\bb}\bb = \frac{\bb\bb^\top}{\bb^\top\bb}\ba$.}
$$

%

\begin{figure}[h]
\centering
\vspace{-0.35cm}
\subfigtopskip=2pt
\subfigbottomskip=2pt
\subfigcapskip=-5pt
\subfigure[Project onto a line.]{\label{fig:project-line}
\includegraphics[width=0.44\linewidth]{./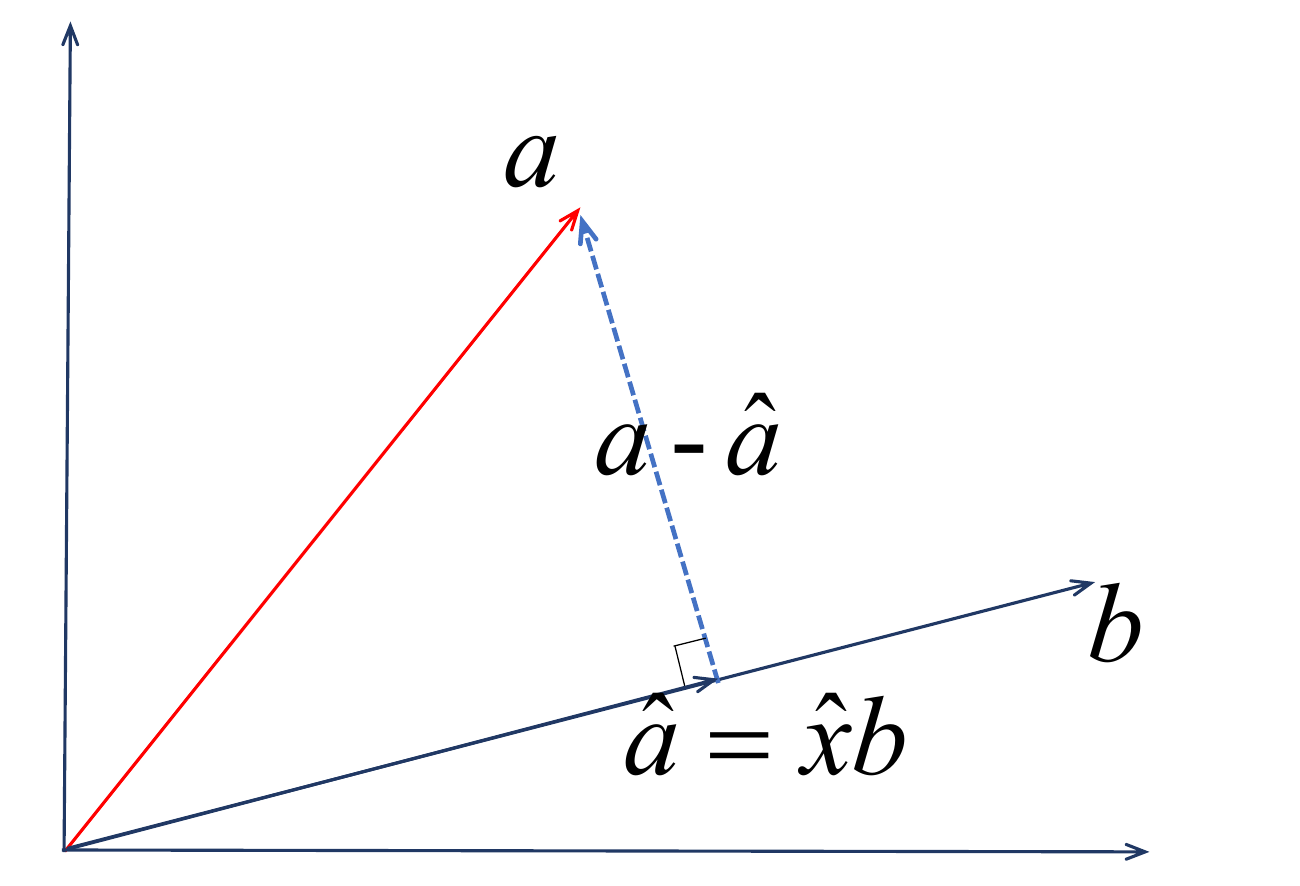}}
\quad 
\subfigure[Project onto a space.]{\label{fig:project-space}
\includegraphics[width=0.44\linewidth]{./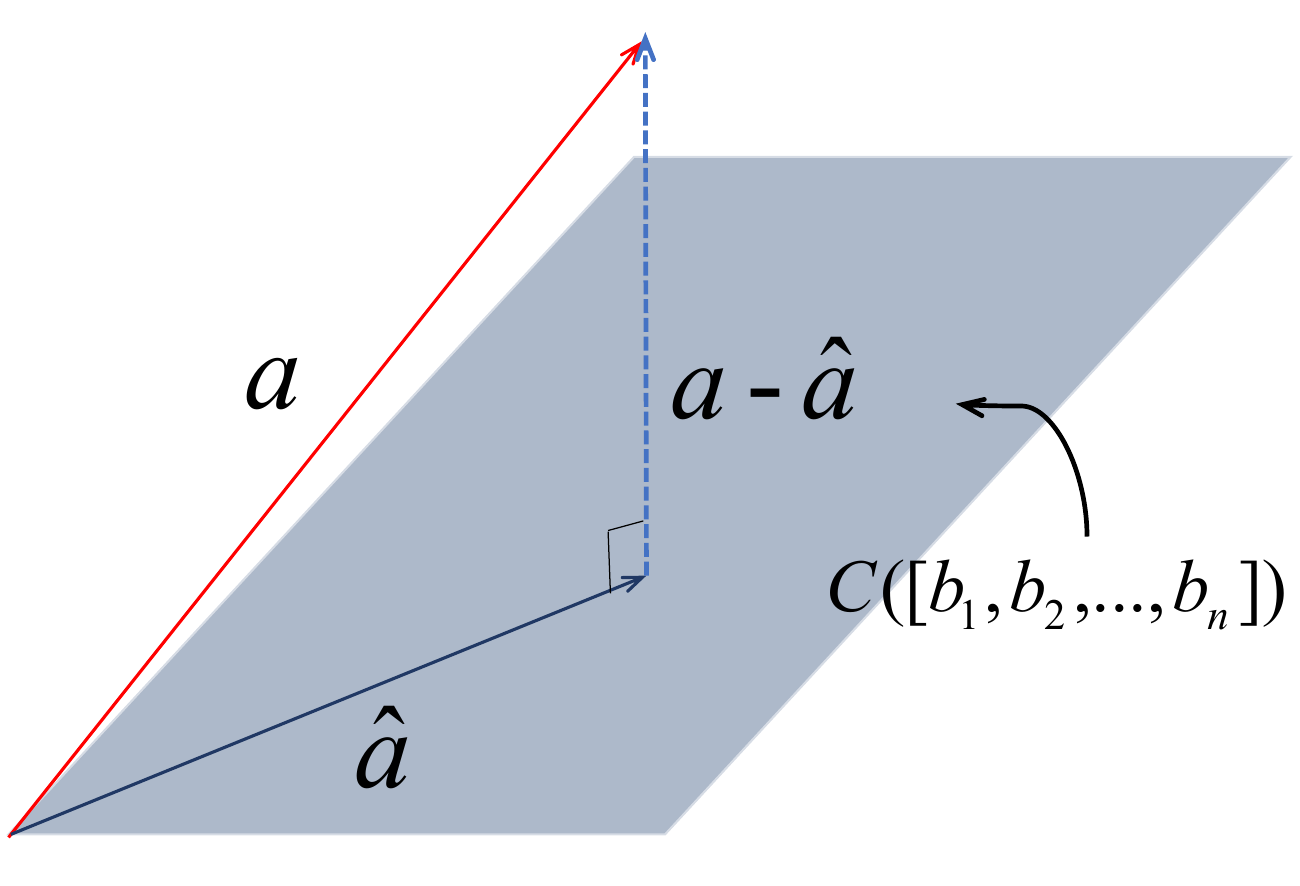}}
\caption{Project a vector onto a line and onto a space.}
\label{fig:projection-qr}
\end{figure}

\index{Orthogonal projection}
\subsection*{Project a Vector Onto a Plane}\label{section:project-onto-a-plane}
Projecting a vector $\ba$ onto a space spanned by $\bb_1, \bb_2, \ldots, \bb_p$ involves determining the vector closest to $\ba$ within the column space of $[\bb_1, \bb_2, \ldots, \bb_p]$. 
The resulting projection vector, denoted as $\widehat{\ba}$, is expressed as a linear combination of $\bb_1, \bb_2, \ldots, \bb_p$: $\widehat{\ba} = \widehat{x}_1\bb_1+ \widehat{x}_2\bb_2+\ldots+\widehat{x}_p\bb_p$.  
This scenario can be formulated as a least squares problem, wherein the normal equation $\bB^\top\bB\widehat{\bx} = \bB^\top\ba$ is solved, where $\bB=[\bb_1, \bb_2, \ldots, \bb_p]$ and $\widehat{\bx}=[\widehat{x}_1, \widehat{x}_2, \ldots, \widehat{x}_p]$.  
For each vector $\bb_i$, the projection of $\ba$ in the direction of $\bb_i$ can be similarly obtained by 
$$
\widehat{\ba}_i = \frac{\bb_i\bb_i^\top}{\bb_i^\top\bb_i}\ba, \gap \forall\, i \in \{1,2,\ldots, p\}.
$$
Let $\widehat{\ba}=\sum_{i=1}^{n}\widehatba_i$, this results in 
$$
\ba^\perp = (\ba-\widehat{\ba}) \perp \cspace(\bB),
$$
i.e., $(\ba-\widehat{\ba})$ is perpendicular to the column space of $\bB=[\bb_1, \bb_2, \ldots, \bb_p]$, as shown in Figure~\ref{fig:project-space}.
See Section~\ref{section:by-geometry-hat-matrix} for more details about  projection matrices.

\index{Gram-Schmidt process}
\subsection*{Existence of the QR Decomposition via the Gram-Schmidt Process}\label{section:gram-schmidt-process}
We proceed by establishing the Gram-Schmidt process through vector projection. 
Given three independent vectors $\ba_1, \ba_2, \ba_3$ and the space spanned by the three vectors, denoted as $\cspace{([\ba_1, \ba_2, \ba_3])}$, i.e., the column space of matrix $[\ba_1, \ba_2, \ba_3]$. We intend to construct three orthogonal vectors $\bb_1, \bb_2, \bb_3$ such that $\cspace{([\bb_1, \bb_2, \bb_3])}$ = $\cspace{([\ba_1, \ba_2, \ba_3])}$. 
We then normalize these orthogonal vectors by dividing each by its length, resulting in three orthonormal vectors: $\bq_1 = \frac{\bb_1}{\norm{\bb_1}}$, $\bq_2 = \frac{\bb_2}{\norm{\bb_2}}$, and $\bq_2 = \frac{\bb_2}{\norm{\bb_2}}$.

For the first vector, we directly set $\bb_1 = \ba_1$. The second vector, $\bb_2$, must be perpendicular to the first. 
This is achieved by considering the vector $\ba_2$ and subtracting its projection along $\bb_1$:
\begin{equation}
\begin{aligned}
\bb_2 &= \ba_2- \frac{\bb_1 \bb_1^\top}{\bb_1^\top\bb_1} \ba_2 = (\bI- \frac{\bb_1 \bb_1^\top}{\bb_1^\top\bb_1} )\ba_2   \qquad &(\text{Projection view})\\
&= \ba_2-  \underbrace{\frac{ \bb_1^\top \ba_2}{\bb_1^\top\bb_1} \bb_1}_{\widehatba_2}, \qquad &(\text{Combination view}) \nonumber
\end{aligned}
\end{equation}
where the first equation shows that $\bb_2$ is a multiplication of a matrix and $\ba_2$, i.e., project $\ba_2$ onto the orthogonal complement space of $\cspace{([\bb_1])}$. The second equation shows that $\ba_2$ is a linear combination of $\bb_1$ and $\bb_2$.
Clearly, the space spanned by $\bb_1$ and $\bb_2$ is the same space spanned by $\ba_1$ and $\ba_2$. The situation is shown in Figure~\ref{fig:gram-schmidt1}, in which we choose \textbf{the direction of $\bb_1$ as the $x$-axis in the Cartesian coordinate system}. $\widehatba_2$ is the projection of $\ba_2$ onto the line $\bb_1$. It can be clearly shown that the part of $\ba_2$ perpendicular to $\bb_1$ is $\bb_2 = \ba_2 - \widehatba_2$ from the figure.

For the third vector $\bb_3$, it must be perpendicular to both the $\bb_1$ and $\bb_2$, which is actually the vector $\ba_3$ subtracting its projection along the plane spanned by $\bb_1$ and $\bb_2$:
\begin{equation}\label{equation:gram-schdt-eq2}
\begin{aligned}
\bb_3 &= \ba_3- \frac{\bb_1 \bb_1^\top}{\bb_1^\top\bb_1} \ba_3 - \frac{\bb_2 \bb_2^\top}{\bb_2^\top\bb_2} \ba_3 = (\bI- \frac{\bb_1 \bb_1^\top}{\bb_1^\top\bb_1}  - \frac{\bb_2 \bb_2^\top}{\bb_2^\top\bb_2} )\ba_3   \qquad &(\text{Projection view})\\
&= \ba_3- \underbrace{\frac{ \bb_1^\top\ba_3}{\bb_1^\top\bb_1} \bb_1}_{\widehatba_3} - \underbrace{\frac{ \bb_2^\top\ba_3}{\bb_2^\top\bb_2}  \bb_2}_{\bar{\ba}_3},    \qquad &(\text{Combination view})
\end{aligned}
\end{equation}
where the first equation shows  that $\bb_3$ is a multiplication of a matrix and $\ba_3$, i.e., project $\ba_3$ onto the orthogonal complement space of $\cspace{([\bb_1, \bb_2])}$. The second equation shows that $\ba_3$ is a linear combination of $\bb_1$, $\bb_2$, and $\bb_3$. 
Again, it can be shown that the space spanned by $\bb_1, \bb_2, \bb_3$ is the same space spanned by $\ba_1, \ba_2, \ba_3$. The situation is shown in Figure~\ref{fig:gram-schmidt2}, in which we choose \textbf{the direction of $\bb_2$ as the $y$-axis of the Cartesian coordinate system}. $\widehatba_3$ is the projection of $\ba_3$ onto the line $\bb_1$, and $\bar{\ba}_3$ is the projection of $\ba_3$ onto the line $\bb_2$. It can be shown that the part of $\ba_3$ perpendicular to both $\bb_1$ and $\bb_2$ is $\bb_3=\ba_3-\widehatba_3-\bar{\ba}_3$ from the figure.

Finally, we normalize each vector by dividing its length, resulting in three orthonormal vectors:  $\bq_1 = \frac{\bb_1}{\norm{\bb_1}}$, $\bq_2 = \frac{\bb_2}{\norm{\bb_2}}$, and $\bq_2 = \frac{\bb_2}{\norm{\bb_2}}$.

\begin{figure}[h]
\centering  
\vspace{-0.35cm} 
\subfigtopskip=2pt 
\subfigbottomskip=2pt 
\subfigcapskip=-5pt 
\subfigure[Project $\ba_2$ onto the space perpendicular to $\bb_1$.]{\label{fig:gram-schmidt1}
\includegraphics[width=0.44\linewidth]{./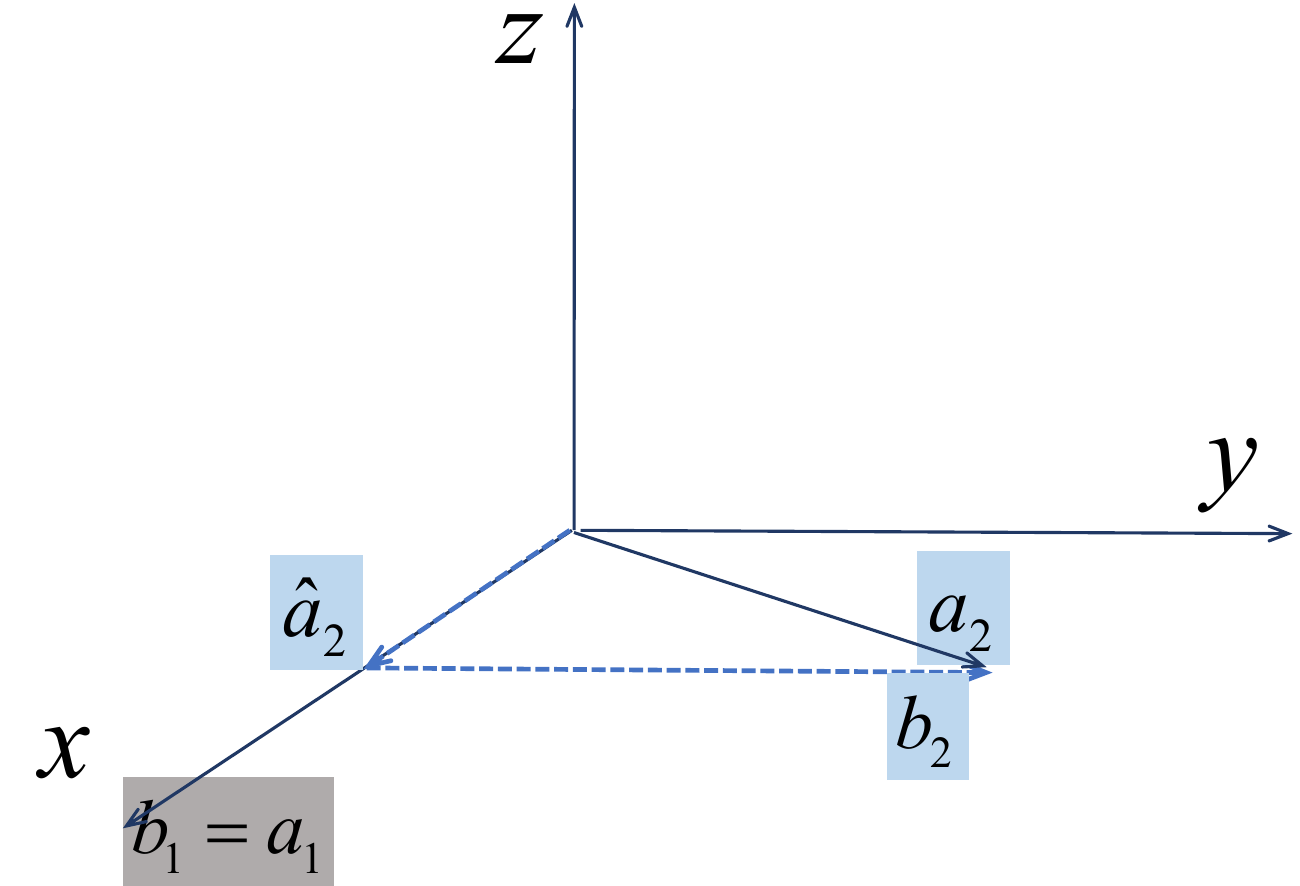}}
\quad 
\subfigure[Project $\ba_3$ onto the space perpendicular to $\bb_1$ and $\bb_2$.]{\label{fig:gram-schmidt2}
\includegraphics[width=0.44\linewidth]{./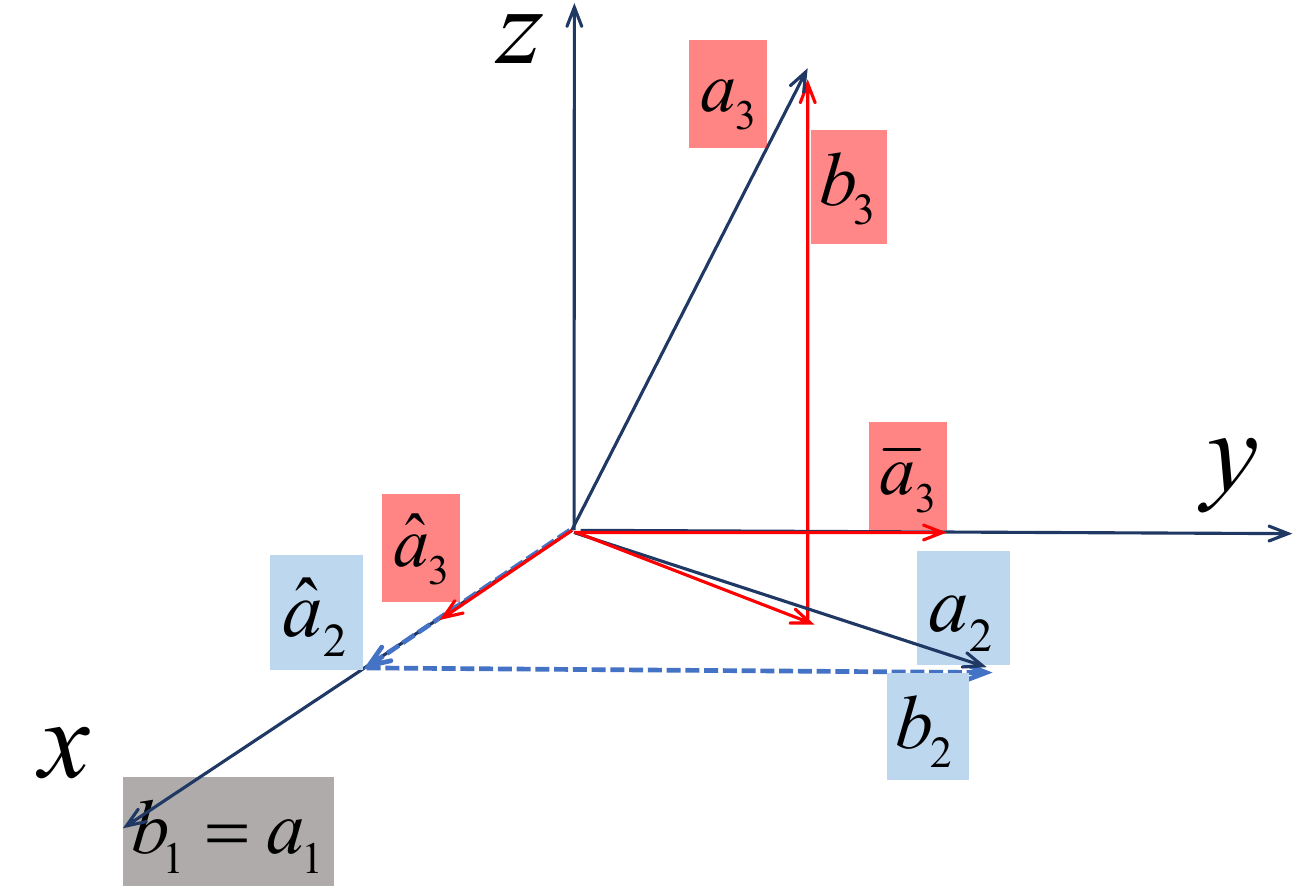}}
\caption{Gram-Schmidt process.}
\label{fig:gram-schmidt-12}
\end{figure}

This idea can be extended to a set of vectors rather than only three. And we refer to this process as the \textit{Gram-Schmidt process}. After this process, the matrix $\bX$ will be triangularized. The method is named after Jørgen Pedersen Gram and Erhard Schmidt, but it appeared earlier in the work of Pierre-Simon Laplace in the theory of Lie group decomposition.

The Gram–Schmidt process is not the sole algorithm for obtaining the QR decomposition.
There are several other QR decomposition algorithms available, such as \textit{Householder reflections} and \textit{Givens rotations}, which exhibit greater robustness in the presence of round-off errors; see Section~\ref{section:ls_qr_gen}.  
These QR decomposition methods may also alter the order in which the columns of $\bX$ are processed.

\subsection*{Properties of the QR Decomposition}
\paragrapharrow{Orthonormal basis.}
For any matrix $\bX$, we have the property: $\nspace(\bX^\top)$ is the orthogonal complement of the column space $\cspace(\bX)$ in $\real^n$: $\dim(\nspace(\bX^\top))+\dim(\cspace(\bX))=n$;
This relationship is known  as the rank-nullity theorem, and its proof can be found in Theorem~\ref{theorem:fundamental-linear-algebra}.
Specifically, QR decomposition yields a basis for this subspace. 
In singular value decomposition, we will also find the basis for $\nspace(\bX)$ and $\cspace(\bX^\top)$. 

\index{Orthonormal basis}
\begin{theoremHigh}[Orthonormal basis in $\real^n$]\label{theorem:qr-four-orthonormal-Basis}
Given the full QR decomposition of $\bX\in\real^{n\times p}$ with full rank $p$, we have the following property:
\begin{itemize}
\item $\{\bq_1,\bq_2 \ldots,\bq_p\}$ is an orthonormal basis of $\cspace(\bX)$;
\item$\{\bq_{p+1},\bq_{p+2}, \ldots,\bq_n\}$ is an orthonormal basis of $\nspace(\bX^\top)$. 
\end{itemize}
\end{theoremHigh}
\begin{proof}[of Theorem~\ref{theorem:qr-four-orthonormal-Basis}]
Following from the Gram-Schmidt process, it is trivial that $\spn\{\bx_1, \bx_2, \ldots, \bx_k\}$ is equal to $\spn\{\bq_1, \bq_2, \ldots, \bq_k\}$ for all $k\in\{1, 2, \ldots, p\}$. Thus, $\cspace(\bX) = \spn\{\bx_1, \bx_2, \ldots, \bx_p\}=\spn\{\bq_1, \bq_2, \ldots, \bq_p\}$, and $\{\bq_1, \bq_2, \ldots, \bq_p\}$ is an orthonormal basis for the column space of $\bX$. 
Additionally, we have  $ \nspace(\bX^\top) \bot\cspace(\bX)$, and $\dim(\nspace(\bX^\top))=n-\dim(\cspace(\bX))=n-p$. 
Since the space spanned by $\{\bq_{p+1},\bq_{p+2}, \ldots,\bq_n\}$ is also perpendicular to $\cspace(\bX)$ with dimension $n-p$, thus, $\{\bq_{p+1},\bq_{p+2}, \ldots,\bq_n\}$ is an orthonormal basis for $\nspace(\bX^\top)$.
\end{proof}

\paragrapharrow{Uniqueness of the QR decomposition.}
The QR decomposition is generally not unique. 

\begin{example}[Non-uniqueness of the QR decomposition\index{Uniqueness}]
Suppose the matrix $\bX$ is given by 
$$
\bX = 
\begin{bmatrix}
4 & 1 \\
3 & 2
\end{bmatrix}.
$$
The QR decomposition of $\bX$ can be obtained by 
$$
\begin{aligned}
\bX &= \bQ_1\bR_1=
\begin{bmatrix}
0.8 & 0.6 \\
0.6 & -0.8
\end{bmatrix}
\begin{bmatrix}
5 & 2 \\
0 & -1
\end{bmatrix}
&=& \bQ_2\bR_2=
\begin{bmatrix}
0.8 & -0.6 \\
0.6 & 0.8
\end{bmatrix}
\begin{bmatrix}
5 & 2 \\
0 & 1
\end{bmatrix}\\
&=\bQ_3\bR_3=
\begin{bmatrix}
-0.8 & -0.6 \\
-0.6 & 0.8
\end{bmatrix}
\begin{bmatrix}
-5 & -2 \\
0 & 1
\end{bmatrix}
&=& \bQ_4\bR_4=
\begin{bmatrix}
-0.8 & 0.6 \\
-0.6 & -0.8
\end{bmatrix}
\begin{bmatrix}
-5 & -2 \\
0 & -1
\end{bmatrix}.
\end{aligned}
$$
Thus, the QR decomposition of $\bX$ is not unique.
\end{example}

However, the uniqueness of the \textit{reduced} QR decomposition for a full column rank matrix $\bX$ is guaranteed when $\bR$ has positive diagonals.
\begin{theoremHigh}[Uniqueness of the reduced QR decomposition]\label{theorem:unique-qr}
Suppose matrix $\bX$ is an $n\times p$ matrix with full column rank $p$ and $n\geq p$. Then, the \textit{reduced} QR decomposition is unique if the main diagonal values of $\bR$ are positive.
\end{theoremHigh}
\index{Uniqueness}
\begin{proof}[of Theorem~\ref{theorem:unique-qr}]
Suppose the \textit{reduced} QR decomposition is not unique, we can complete it into a \textit{full} QR decomposition, then we can find two such full decompositions satisfying $\bX=\bQ_1\bR_1 = \bQ_2\bR_2$. 
This  implies $\bR_1 = \bQ_1^{-1}\bQ_2\bR_2 \triangleq \bV \bR_2$, where $\bV\triangleq \bQ_1^{-1}\bQ_2$ is an orthogonal matrix. Write out the equation, we have 
$$
\begin{aligned}
\bR_1 &= 
\begin{bmatrix}
r_{11} & r_{12}& \dots & r_{1p}\\
& r_{22}& \dots & r_{2p}\\
&       &    \ddots  & \vdots \\
\multicolumn{2}{c}{\raisebox{1.3ex}[0pt]{\Huge0}} & & r_{pp} \\
\bzero & \bzero &\ldots & \bzero
\end{bmatrix}=
\begin{bmatrix}
v_{11}& v_{12} & \ldots & v_{1n}\\
v_{21} & v_{22} & \ldots & v_{2n}\\
\vdots & \vdots & \ddots & \vdots\\
v_{n1} & v_{n2} & \ldots & v_{\textcolor{black}{nn}}
\end{bmatrix}
\begin{bmatrix}
s_{11} & s_{12}& \dots & s_{1p}\\
& s_{22}& \dots & s_{2p}\\
&       &    \ddots  & \vdots \\
\multicolumn{2}{c}{\raisebox{1.3ex}[0pt]{\Huge0}} & & s_{pp} \\
\bzero & \bzero &\ldots & \bzero
\end{bmatrix}= \bV\bR_2,
\end{aligned}
$$
This implies 
$$
r_{11} = v_{11} s_{11}, \qquad v_{21}=v_{31}=v_{41}=\ldots=v_{n1}=0.
$$
Since $\bV$ contains mutually orthonormal columns, and the first column of $\bV$ is of norm 1. Thus, $v_{11} = \pm 1$.
We notice that $r_{ii}> 0$ and $s_{ii}> 0$ for $i\in \{1,2,\ldots,p\}$ by assumption such that $r_{11}> 0$ and $s_{11}> 0$, implying that  $v_{11}$ can only be positive 1. Since $\bV$ is an orthogonal matrix, we also have 
$$
v_{12}=v_{13}=v_{14}=\ldots=v_{1n}=0. 
$$
Applying this process to the submatrices of $\bR_1$, $\bV$, and $\bR_2$, we will find the upper-left submatrix of $\bV$ is the identity matrix: $\bV[1:p,1:p]=\bI_p$ such that $\bR_1=\bR_2$. This implies $\bQ_1[:,1:p]=\bQ_2[:,1:p]$ and leads to a contradiction. And thus the reduced QR decomposition is unique.
We complete the proof.
\end{proof}

\subsection*{LQ Decomposition}\label{section:lq-decomp}
We have proved the existence of the QR decomposition via the Gram-Schmidt process, in which case we are interested in the column space of a matrix $\bX=[\bx_1, \bx_2, \ldots, \bx_p] \in \real^{n\times p}$. The successive spaces spanned by the columns $\bx_1, \bx_2, \ldots$ of $\bX$ are
$$
\cspace([\bx_1])\,\,\,\, \subseteq\,\,\,\, \cspace([\bx_1, \bx_2]) \,\,\,\,\subseteq\,\,\,\, \cspace([\bx_1, \bx_2, \bx_3])\,\,\,\, \subseteq\,\,\,\, \ldots,
$$
The concept behind QR decomposition involves generating a sequence of orthonormal vectors $\bq_1, \bq_2, \ldots$, spanning the same successive subspaces:
$$
\left\{\cspace([\bq_1])=\cspace([\bx_1]) \right\}\,\,\,\, \subseteq\,\,\,\, \{\cspace([\bq_1, \bq_2])=\cspace([\bx_1, \bx_2])\} \,\,\,\, \subseteq\,\,\,\, \ldots,
$$
However, in many applications (see \citet{schilders2009solution}),  interest extends to the row space of a matrix $\bY=[\by_1^\top; \by_2^\top; \ldots ;\by_n^\top] \in \real^{n\times p}$, where, abusing the notation, $\by_i$ denotes the $i$-th row of $\bY$. The successive spaces spanned by the rows $\by_1, \by_2, \ldots$ of $\bY$ are
$$
\cspace([\by_1])\,\,\,\, \subseteq\,\,\,\, \cspace([\by_1, \by_2]) \,\,\,\,\subseteq\,\,\,\, \cspace([\by_1, \by_2, \by_3])\,\,\,\, \subseteq\,\,\,\, \ldots.
$$
The QR decomposition thus has a counterpart that identifies the orthogonal row space.
By applying QR decomposition on $\bY^\top = \bQ_0\bR$, we recover the LQ decomposition of the matrix $\bY = \bL \bQ$, where $\bQ = \bQ_0^\top$ and $\bL = \bR^\top$. The LQ decomposition is helpful in demonstrating the existence of the UTV decomposition in Section~\ref{section:utv_ls}.

\index{LQ decomposition}
\begin{theoremHigh}[LQ decomposition]\label{theorem:lq-decomposition}
Every $n\times p$ matrix $\bY$ (whether linearly independent or dependent rows) with $p\geq n$ admits the following decomposition:
$$
\bY = \bL\bQ,
$$
where 
\begin{enumerate}
\item \textbf{Reduced}: $\bL$ is an $n\times n$ lower triangular matrix and $\bQ$ is $n\times p$ with orthonormal rows. This is known as the \textit{reduced LQ decomposition};

\item \textbf{Full}: $\bL$ is an $n\times p$ lower triangular matrix and $\bQ$ is $p\times p$ with orthonormal rows. This is known as the \textit{full LQ decomposition}. If we further restrict the lower triangular matrix to be a square matrix, the full LQ decomposition can be denoted as 
$$
\bY = \begin{bmatrix}
\bL_0 & \bzero
\end{bmatrix}\bQ,
$$
where $\bL_0$ is an $n\times n$ square lower triangular matrix.
\end{enumerate}
\end{theoremHigh}

Similarly, a comparison between the reduced and full LQ decomposition is shown in Figure~\ref{fig:lq-comparison}.

\begin{figure}[H]
\centering  
\vspace{-0.35cm} 
\subfigtopskip=2pt 
\subfigbottomskip=2pt
\subfigcapskip=-5pt
\subfigure[Reduced LQ decomposition.]{\label{fig:lqhalf}
\includegraphics[width=0.47\linewidth]{./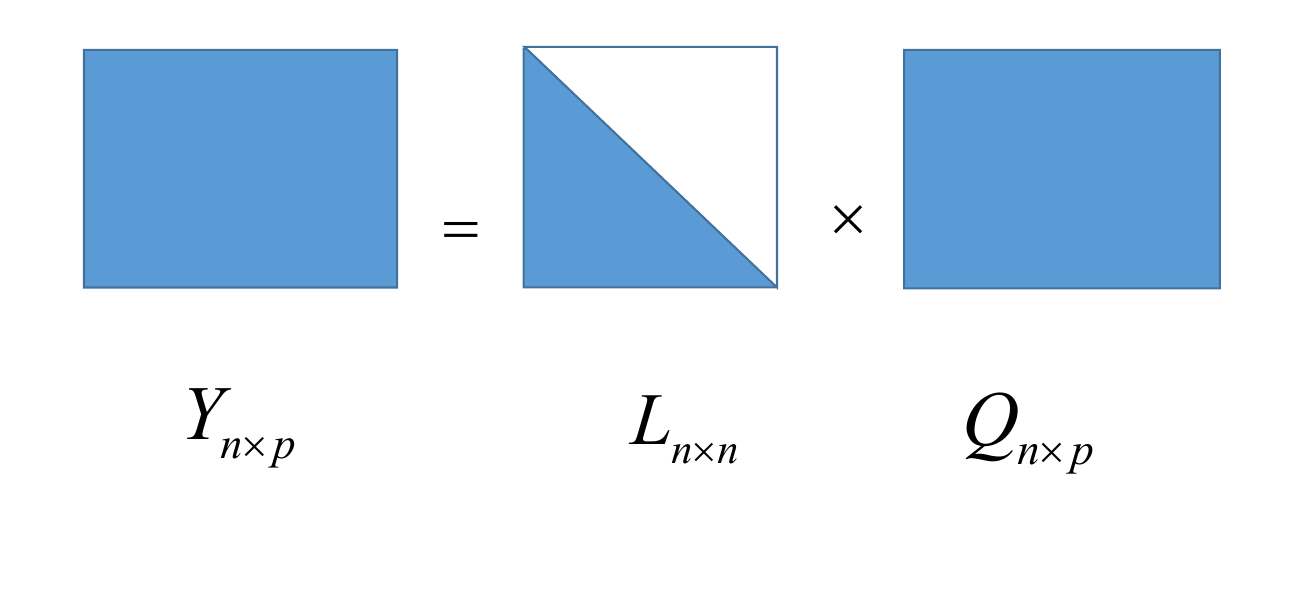}}
\quad 
\subfigure[Full LQ decomposition.]{\label{fig:lqall}
\includegraphics[width=0.47\linewidth]{./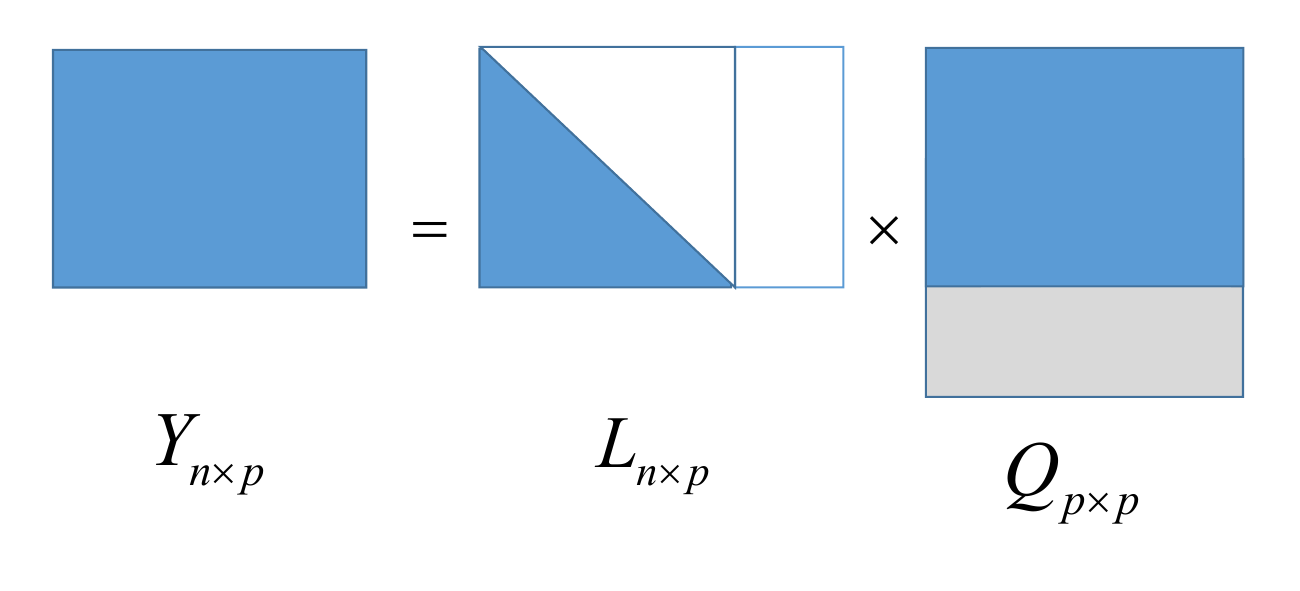}}
\caption{Comparison between the reduced and full LQ decomposition. White entries are zero, and blue entries are not necessarily zero. Gray columns denote silent rows.}
\label{fig:lq-comparison}
\end{figure}

\subsection{Schur and Spectral Decomposition}\label{section:spectral-decomposition}
\begin{theoremHigh}[Schur decomposition]\label{theorem:schur-decomposition}
Any real square matrix $\bX\in \real^{n\times n}$ with real eigenvalues admits the following decomposition:
$$
\bX = \bQ\bU\bQ^\top,
$$
where $\bQ$ is an orthogonal matrix, and $\bU$ is an upper triangular matrix. That is, any square matrix $\bX$ with real eigenvalues can be triangularized.
\end{theoremHigh}
\index{Schur decomposition}

\index{Symmetric matrix}
When dealing with a symmetric matrix $\bX=\bX^\top$, we find that $\bQ\bU\bQ^\top = \bQ\bU^\top\bQ^\top$. 
Consequently, $\bU$ is a diagonal matrix. And this diagonal matrix actually contains the eigenvalues of $\bX$. All the columns of $\bQ$ are eigenvectors of $\bX$. 
This leads us to the conclusion that symmetric matrices are inherently diagonalizable, even in the presence of repeated eigenvalues.

Moreover, the matrix $\bX$ and $\bU$ are in the notion of similar matrices.
\begin{definition}[Similar matrices]
For any nonsingular matrix $\bP$, the matrices $\bX$ and $\bP\bX\bP^{-1}$ are called \textit{similar matrices}.
\end{definition}
\begin{lemma}[Eigenvalue, trace and rank of similar matrices\index{Trace}\index{Similar matrices}]\label{lemma:eigenvalue-similar-matrices}
Given a nonsingular matrix $\bP$,
any eigenvalue of $\bX$ is also an eigenvalue of $\bP\bX\bP^{-1}$. 
The converse is also true that any eigenvalue of $\bP\bX\bP^{-1}$ is likewise an eigenvalue of $\bX$. 
In other words, $\Lambda(\bX) = \Lambda(\bP\bX\bP^{-1})$, where $\Lambda(\bZ)$ represents the spectrum of matrix $\bZ$ (Definition~\ref{definition:spectrum}).

Furthermore, the trace and rank of $\bX$ are equal to those of matrix $\bP\bX\bP^{-1}$ for any nonsingular matrix $\bP$.
\end{lemma}
\begin{proof}[of Lemma~\ref{lemma:eigenvalue-similar-matrices}]
For any eigenpair $(\lambda, \bbeta)$ of $\bX$, we have $\bX\bbeta =\lambda \bbeta$. Consequently, $\lambda \bP\bbeta = \bP\bX\bP^{-1} \bP\bbeta$, demonstrating that $\bP\bbeta$ is an eigenvector of $\bP\bX\bP^{-1}$ associated with $\lambda$.

Similarly, for any eigenpair $(\lambda, \bbeta)$ of $\bP\bX\bP^{-1}$, we have $\bP\bX\bP^{-1} \bbeta = \lambda \bbeta$. Then, $\bX\bP^{-1} \bbeta = \lambda \bP^{-1}\bbeta$, indicating that $\bP^{-1}\bbeta$ is an eigenvector of $\bX$ corresponding to $\lambda$. 

Regarding the the trace of $\bP\bX\bP^{-1}$, we can establish that $\trace(\bP\bX\bP^{-1}) = \trace(\bX\bP^{-1}\bP) = \trace(\bX)$, where the first equality comes from the fact that the
trace of a product is invariant under cyclical permutations of the factors:
\begin{equation}
\trace(\bA\bB\bC) = \trace(\bB\bC\bA) = \trace(\bC\bA\bB), \nonumber
\end{equation}
if all $\bA\bB\bC$, $\bB\bC\bA$, and $\bC\bA\bB$ exist.

Regarding the  rank of $\bP\bX\bP^{-1}$, we separate it into two claims as follows.
\paragraph{Rank claim 1: $\rank(\bZ\bX)=\rank(\bX)$ if $\bZ$ is nonsingular.}
We will begin by demonstrating that $\rank(\bZ\bX)=\rank(\bX)$ if $\bZ$ is nonsingular. 
Consider any vector $\bn$ in the null space of $\bX$, that is, $\bX\bn = \bzero$. 
Consequently, $\bZ\bX\bn = \bzero$, that is, $\bn$ also resides in the null space of $\bZ\bX$. 
This, in turn, implies  $\nspace(\bX)\subseteq \nspace(\bZ\bX)$.

Conversely, for any vector $\bmm$ in the null space of $\bZ\bX$, i.e., $\bZ\bX\bmm = \bzero$, we can deduce that $\bX\bmm = \bZ^{-1} \bzero=\bzero$. That is, $\bmm$  also lies in the null space of $\bX$. And this indicates $\nspace(\bZ\bX)\subseteq \nspace(\bX)$.

Combining the two arguments presented above leads to the following conclusion:
$$
\nspace(\bX) = \nspace(\bZ\bX)\quad  \longrightarrow \quad \rank(\bZ\bX)=\rank(\bX).
$$

\paragraph{Rank claim 2: $\rank(\bX\bZ)=\rank(\bX)$ if $\bZ$ is nonsingular.}
We observe that the row rank of a matrix is equivalent to its column rank (Lemma~\ref{lemma:equal-dimension-rank}). 
Therefore, $\rank(\bX\bZ) = \rank(\bZ^\top\bX^\top)$. Since $\bZ^\top$ is nonsingular, as per claim 1, we can conclude that $\rank(\bZ^\top\bX^\top) = \rank(\bX^\top) = \rank(\bX)$, where the last equality follows again from the fact that the row rank is equal to the column rank for any matrix. This establishes that $\rank(\bX\bZ)=\rank(\bX)$, as claimed.

Since $\bP$ and $\bP^{-1}$ are nonsingular, we can conclude  that $\rank(\bP\bX\bP^{-1}) = \rank(\bX\bP^{-1}) = \rank(\bX)$, where the first equality follows from claim 1, and the second equality follows from claim 2. We complete the proof.
\end{proof}

\index{Determinant}
\subsection*{Existence of the Schur Decomposition}

To prove Theorem~\ref{theorem:schur-decomposition}, we need to use the following lemma.
\begin{lemma}[Submatrix with same eigenvalue]\label{lemma:submatrix-same-eigenvalue}
Suppose the square matrix $\bX_{k+1}\in \real^{(k+1)\times (k+1)}$ has real eigenvalues $\lambda_1, \lambda_2, \ldots, \lambda_{k+1}$. Then, we can construct a $k\times k$ matrix $\bX_{k}$ with eigenvalues $\lambda_2, \lambda_3, \ldots, \lambda_{k+1}$ by 
$$
\bX_{k} = 
\begin{bmatrix}
-\bp_2^\top- \\
-\bp_3^\top- \\
\vdots \\
-\bp_{k+1}^\top-
\end{bmatrix}
\bX_{k+1}
\begin{bmatrix}
\bp_2 & \bp_3 &\ldots &\bp_{k+1}
\end{bmatrix},
$$
where $\bp_1$ is an eigenvector of $\bX_{k+1}$ with norm 1 corresponding to the eigenvalue $\lambda_1$, and $\bp_2, \bp_3, \ldots, \bp_{k+1}$ are any mutually orthonormal vectors that are  orthogonal to $\bp_1$. 
\end{lemma}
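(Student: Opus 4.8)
The plan is to recognize that the matrix $\bX_k$ constructed in the lemma is nothing but the bottom-right $k\times k$ block of the orthogonal similarity transform $\bP^\top \bX_{k+1}\bP$, where $\bP = [\bp_1, \bp_2, \ldots, \bp_{k+1}]$ is the full matrix assembled from the given vectors. First I would check that $\bP$ is orthogonal: since $\bp_1$ has unit norm and $\bp_2, \ldots, \bp_{k+1}$ are mutually orthonormal and each orthogonal to $\bp_1$, the columns form an orthonormal basis of $\real^{k+1}$, so $\bP^\top\bP = \bP\bP^\top = \bI_{k+1}$.

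Next I would compute the block structure of $\bP^\top \bX_{k+1}\bP$. Because $\bp_1$ is an eigenvector with $\bX_{k+1}\bp_1 = \lambda_1 \bp_1$, the first column of $\bX_{k+1}\bP$ equals $\lambda_1\bp_1$; applying $\bP^\top$ on the left and using $\bp_i^\top\bp_1 = \delta_{i1}$ gives $\lambda_1\bP^\top\bp_1 = \lambda_1\be_1$. Hence the transformed matrix is block upper-triangular,
$$
\bP^\top \bX_{k+1}\bP =
\begin{bmatrix}
\lambda_1 & \bw^\top \\
\bzero & \bX_k
\end{bmatrix},
$$
where the bottom-right block is exactly $[\bp_2,\ldots,\bp_{k+1}]^\top \bX_{k+1} [\bp_2,\ldots,\bp_{k+1}] = \bX_k$ from the lemma statement, and $\bw^\top$ is an irrelevant row vector.

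Then I would pass to eigenvalues. By Lemma~\ref{lemma:eigenvalue-similar-matrices} (eigenvalues are preserved under similarity, and an orthogonal $\bP$ is a special case since $\bP^{-1}=\bP^\top$), the matrix $\bP^\top\bX_{k+1}\bP$ has the same eigenvalues $\lambda_1,\lambda_2,\ldots,\lambda_{k+1}$ as $\bX_{k+1}$. Expanding the characteristic determinant along the first column, whose only nonzero entry is $\lambda_1-\lambda$, yields the factorization
$$
\det\!\left(\bP^\top\bX_{k+1}\bP - \lambda\bI_{k+1}\right) = (\lambda_1-\lambda)\,\det(\bX_k - \lambda\bI_k).
$$
Comparing the roots of the two sides, the $k$ eigenvalues of $\bX_k$ must be precisely $\lambda_2,\lambda_3,\ldots,\lambda_{k+1}$, which is the claim.

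The only mildly delicate point, and the one I expect to be the main obstacle, is justifying the determinant factorization for the block upper-triangular matrix; but this follows routinely from a single cofactor expansion along the first column (whose subdiagonal entries are zero), so no heavier block-determinant identity is needed. Everything else — orthogonality of $\bP$, the reduction to block-triangular form, and preservation of eigenvalues under similarity — is immediate from results already established earlier in the excerpt.
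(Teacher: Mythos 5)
Your proof is correct and follows essentially the same route as the paper: form the orthogonal matrix $\bP=[\bp_1,\bp_2,\ldots,\bp_{k+1}]$, use $\bX_{k+1}\bp_1=\lambda_1\bp_1$ to put $\bP^\top\bX_{k+1}\bP$ into block triangular form with $\bX_k$ as the lower-right block, and factor the characteristic polynomial as $(\lambda_1-\lambda)\det(\bX_k-\lambda\bI_k)$. In fact your write-up is slightly more careful than the paper's, which displays the top-right block as $\bzero$ (valid only when $\bX_{k+1}$ is symmetric); keeping an arbitrary row $\bw^\top$ there is the correct general form, and the determinant factorization — whether by cofactor expansion along the first column or by the block-determinant identity the paper cites — goes through unchanged.
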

\begin{proof}[of Lemma~\ref{lemma:submatrix-same-eigenvalue}]
Let $\bP_{k+1} = [\bp_1, \bp_2, \ldots, \bp_{k+1}]$. We have $\bP_{k+1}^\top\bP_{k+1}=\bI$  and 
$$
\bP_{k+1}^\top \bX_{k+1} \bP_{k+1} =
\begin{bmatrix}
\lambda_1 & \bzero \\
\bzero  & \bX_{k}
\end{bmatrix}. 
$$
For any eigenvalue $\lambda = \{\lambda_2, \lambda_3, \ldots, \lambda_{k+1}\}$, by Lemma~\ref{lemma:determinant-intermezzo}, we have
$$
\begin{aligned}
\det(\bX_{k+1} -\lambda\bI) 
&= \det(\bP_{k+1}^\top (\bX_{k+1}-\lambda\bI)  \bP_{k+1}) 
=\det(\bP_{k+1}^\top \bX_{k+1}\bP_{k+1} - \lambda\bP_{k+1}^\top\bP_{k+1}) \\
&= \det\left(
\begin{bmatrix}
\lambda_1-\lambda & \bzero \\
\bzero & \bX_k - \lambda\bI
\end{bmatrix}
\right)
=(\lambda_1-\lambda)\det(\bX_k-\lambda\bI),
\end{aligned}
$$
where the last equality follows from the fact that if matrix $\bM$ has a block formulation: 
$\bM=
\footnotesize
\begin{bmatrixfoot}
\bE & \bF \\
\bG & \bH 
\end{bmatrixfoot}$, then $\det(\bM) = \det(\bE)\det(\bH-\bG\bE^{-1}\bF)$.
Since $\lambda$ is an eigenvalue of $\bX$ and $\lambda \neq \lambda_1$, then $\det(\bX_{k+1} -\lambda\bI) = (\lambda_1-\lambda)\det(\bX_{k}-\lambda\bI)=0$ means that $\lambda$ is also an eigenvalue of $\bX_{k}$.
\end{proof}

We then prove the existence of the Schur decomposition by induction.
\begin{proof}[of Theorem~\ref{theorem:schur-decomposition}]
We note that the theorem is trivial when $n=1$ by setting $Q=1$ and $U=A$. Suppose the theorem is true when $n=k$ for some $k\geq 1$. If we prove the theorem is also true when $n=k+1$, then we complete the proof.

Suppose for $n=k$, the theorem is true for $\bX_k =\bQ_k \bU_k \bQ_k^\top$. 

Suppose further {$\bP_{k+1}$ contains orthogonal vectors $\bP_{k+1} = [\bp_1, \bp_2, \ldots, \bp_{k+1}]$ as constructed in Lemma~\ref{lemma:submatrix-same-eigenvalue}, where $\bp_1$ is an eigenvector of $\bX_{k+1}$ corresponding to the eigenvalue $\lambda_1$, and its norm is 1; and $\bp_2, \ldots, \bp_{k+1}$ are orthonormal to $\bp_1$}. Let the other $k$ eigenvalues of $\bX_{k+1}$ be $\lambda_2, \lambda_3, \ldots, \lambda_{k+1}$. Since we assume the theorem is true for $n=k$, we can find a matrix $\bX_{k}$ with eigenvalues $\lambda_2, \lambda_3, \ldots, \lambda_{k+1}$. So we have the following property by Lemma~\ref{lemma:submatrix-same-eigenvalue}:
$$
\bP_{k+1}^\top \bX_{k+1} \bP_{k+1} = 
\begin{bmatrix}
\lambda &\bzero \\
\bzero & \bX_k
\end{bmatrix} \qquad
\text{and} \qquad 
\bX_{k+1} \bP_{k+1} =
\bP_{k+1} 
\begin{bmatrix}
\lambda_1 &\bzero \\
\bzero & \bX_k
\end{bmatrix}.
$$
Let 
$
\bQ_{k+1} \triangleq \bP_{k+1}
\begin{bmatrixfoot}
1 &\bzero \\
\bzero & \bQ_k
\end{bmatrixfoot}.
$
Then, it follows that
$$
\begin{aligned}
\bX_{k+1} \bQ_{k+1} 
&= 
\bX_{k+1}
\bP_{k+1}
\begin{bmatrix}
1 &\bzero \\
\bzero & \bQ_k
\end{bmatrix}
=
\bP_{k+1} 
\begin{bmatrix}
\lambda_1 &\bzero \\
\bzero & \bX_k
\end{bmatrix}
\begin{bmatrix}
1 &\bzero \\
\bzero & \bQ_k
\end{bmatrix} 
=
\bP_{k+1}
\begin{bmatrix}
\lambda_1 & \bzero \\
\bzero & \bX_k\bQ_k
\end{bmatrix}\\
&\stackrel{\dag}{=}
\bP_{k+1}
\begin{bmatrix}
\lambda_1 & \bzero \\
\bzero & \bQ_k \bU_k   
\end{bmatrix}   
=\bP_{k+1}
\begin{bmatrix}
1 &\bzero \\
\bzero & \bQ_k
\end{bmatrix}
\begin{bmatrix}
\lambda_1 &\bzero \\
\bzero & \bU_k
\end{bmatrix}
=\bQ_{k+1}\bU_{k+1}, 
\end{aligned}
$$
where the equality ($\dag$) follows from the  the assumption for $n=k$, and the last equality follows from the fact that $\bU_{k+1} = \begin{bmatrixfoot}
	\lambda_1 &\bzero \\
	\bzero & \bU_k
\end{bmatrixfoot}$.
We then have $\bX_{k+1} = \bQ_{k+1}\bU_{k+1}\bQ_{k+1}^\top$, where $\bU_{k+1}$ is an upper triangular matrix, and $\bQ_{k+1}$ is an orthogonal matrix since $\bP_{k+1}$ and 
$\begin{bmatrixfoot}
1 &\bzero \\
\bzero & \bQ_k
\end{bmatrixfoot}$ are both orthogonal matrices.
This completes the proof.
\end{proof}

\subsection*{Other Forms of the Schur Decomposition}\label{section:other-form-schur-decom}

From the proof of the Schur decomposition, we obtain the upper triangular matrix $\bU_{k+1}$ by appending the eigenvalue $\lambda_1$ to $\bU_k$. From this process, the values on the diagonal are always eigenvalues. Therefore, we can decompose the upper triangular into two parts.

\begin{corollary}[Form 2 of Schur decomposition]\label{corollary:schur-second-form}
Any square matrix $\bX\in \real^{n\times n}$ with real eigenvalues admits the following decomposition:
$$
\bQ^\top\bX\bQ = \bLambda +\bT \qquad \textbf{or} \qquad \bX = \bQ(\bLambda +\bT)\bQ^\top,
$$
where $\bQ$ is an orthogonal matrix, $\bLambda=\diag(\lambda_1, \lambda_2, \ldots, \lambda_n)$ is a diagonal matrix containing the eigenvalues of $\bX$, and $\bT$ is a \textit{strictly upper triangular} matrix (with zeros on the diagonal).
\end{corollary}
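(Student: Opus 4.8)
The plan is to derive this corollary as an essentially immediate consequence of the Schur decomposition already established in Theorem~\ref{theorem:schur-decomposition}. First I would invoke that theorem to write $\bX = \bQ\bU\bQ^\top$, where $\bQ$ is orthogonal and $\bU$ is upper triangular. The only genuinely new content here is the observation that an upper triangular matrix admits a canonical additive splitting into its diagonal and its off-diagonal part: I would set $\bLambda = \diag(u_{11}, u_{22}, \ldots, u_{nn})$ to collect the diagonal entries of $\bU$, and $\bT = \bU - \bLambda$, which is strictly upper triangular by construction (it agrees with $\bU$ above the diagonal and is zero on and below it). Then $\bU = \bLambda + \bT$ gives $\bX = \bQ(\bLambda + \bT)\bQ^\top$, and equivalently $\bQ^\top \bX \bQ = \bLambda + \bT$ after left/right multiplication using $\bQ^\top\bQ = \bQ\bQ^\top = \bI$.

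The substantive step that remains is to identify the diagonal entries $u_{ii}$ with the eigenvalues $\lambda_1, \ldots, \lambda_n$ of $\bX$. I would argue this in two compatible ways. The quickest is to note that the constructive proof of Theorem~\ref{theorem:schur-decomposition} builds $\bU_{k+1}$ by appending the eigenvalue $\lambda_1$ to the upper-left corner of $\bU_k$ at each inductive stage, so by that very construction the diagonal of $\bU$ consists precisely of the eigenvalues of $\bX$. Independently, one can verify this without reference to the construction: since $\bQ$ is orthogonal, $\bX = \bQ\bU\bQ^\top = \bQ\bU\bQ^{-1}$ is similar to $\bU$, and by Lemma~\ref{lemma:eigenvalue-similar-matrices} similar matrices share the same eigenvalues. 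Moreover, for any triangular matrix, Lemma~\ref{lemma:determinant-intermezzo} yields $\det(\bU - \lambda\bI) = \prod_{i=1}^{n}(u_{ii} - \lambda)$, so the eigenvalues of $\bU$ are exactly its diagonal entries. Combining these, the diagonal of $\bU$ is the multiset of eigenvalues of $\bX$, and I would define $\lambda_i = u_{ii}$ accordingly.

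I do not expect a real obstacle in this proof; the corollary is almost a reformulation of Theorem~\ref{theorem:schur-decomposition}. The only point requiring mild care is bookkeeping about the \emph{ordering} of the eigenvalues on the diagonal: the splitting $\bU = \bLambda + \bT$ fixes $\bLambda$ once $\bU$ is given, and the labelling $\lambda_i = u_{ii}$ then simply records the order in which the eigenvalues happen to appear along the diagonal (which depends on the choices of eigenvectors made in the Schur construction). I would therefore state the result with $\bLambda = \diag(\lambda_1, \ldots, \lambda_n)$ understood up to this ordering, and emphasize that no positivity or distinctness of the eigenvalues is needed—only that they are real, which is exactly the hypothesis inherited from Theorem~\ref{theorem:schur-decomposition}.
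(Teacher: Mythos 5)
Your proposal is correct and follows essentially the same route as the paper: invoke Theorem~\ref{theorem:schur-decomposition}, split $\bU=\bLambda+\bT$ into its diagonal and strictly upper triangular parts, and identify the diagonal with the eigenvalues both via the inductive construction and via similarity of $\bX$ and $\bU$ (Lemma~\ref{lemma:eigenvalue-similar-matrices}). Your justification that the eigenvalues of a triangular matrix are its diagonal entries, via $\det(\bU-\lambda\bI)=\prod_{i=1}^{n}(u_{ii}-\lambda)$, is in fact cleaner than the paper's claim that $\bR\be_i=r_{ii}\be_i$ (which only holds when the $i$-th column has no off-diagonal entries), but both arguments reach the same conclusion.
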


A strictly upper triangular matrix is an upper triangular matrix having 0's along the diagonal as well as the lower portion. Another proof for this decomposition is that $\bX$ and $\bU$ (where $\bU = \bQ^\top\bX\bQ$) are similar matrices so that they have the same eigenvalues (Lemma~\ref{lemma:eigenvalue-similar-matrices}). And the eigenvalues of any upper triangular matrices are on the diagonal. 
To see this, for any upper triangular matrix $\bR \in \real^{n\times n}$, where the diagonal values are $r_{ii}$ for all $i\in \{1,2,\ldots,n\}$, we have
$$
\bR \be_i = r_{ii}\be_i,
$$
where $\be_i$ is the $i$-th basis vector in $\real^n$.
So we can decompose $\bU$ into $\bLambda$ and $\bT$.


\subsection*{Spectral Decomposition}
\begin{theoremHigh}[Spectral decomposition]\label{theorem:spectral_theorem}\index{Spectral decomposition}\index{Spectral theorem}
A real matrix $\bX \in \real^{n\times n}$ is symmetric if and only if there exists an orthogonal matrix $\bQ$ and a diagonal matrix $\bLambda$ such that
\begin{equation*}
\bX = \bQ \bLambda \bQ^\top,
\end{equation*}
where the columns of $\bQ = [\bq_1, \bq_2, \ldots, \bq_n]$ are eigenvectors of $\bX$ and are mutually orthonormal, and the entries of $\bLambda=\diag(\lambda_1, \lambda_2, \ldots, \lambda_n)$ are the corresponding eigenvalues of $\bX$, which are real. Specifically, we have the following properties:
\begin{enumerate}
\item A symmetric matrix has only \textbf{real eigenvalues}.
\item The eigenvectors can be chosen \textbf{orthonormal}.
\item The rank of $\bX$ is the number of nonzero eigenvalues.
\item If the eigenvalues are distinct, the eigenvectors are linearly independent.
\end{enumerate}
\end{theoremHigh}

To prove the existence of the spectral decomposition, we need the following lemmas.
\index{Symmetric matrix}
\begin{lemma}[Symmetric matrix property 1 of 4: real eigenvalues]\label{lemma:real-eigenvalues-spectral}
The eigenvalues of any symmetric matrix are all real. 
\end{lemma}
\begin{proof}[of Lemma~\ref{lemma:real-eigenvalues-spectral}]
Suppose eigenvalue $\lambda$ is a complex number $\lambda=a+ib$, where $a$ and $b$ are real. Its complex conjugate is $\bar{\lambda}=a-ib$. Similarly, we have the complex eigenvector $\bbeta = \bc+i\bd$ and its complex conjugate $\bar{\bbeta}=\bc-i\bd$, where $\bc$ and $\bd$ are real vectors. We then have the following property:
$$
\bX \bbeta = \lambda \bbeta\qquad   \underrightarrow{\text{ leads to }}\qquad  \bX \bar{\bbeta} = \bar{\lambda} \bar{\bbeta}\qquad   \underrightarrow{\text{ transpose to }}\qquad  \bar{\bbeta}^\top \bX =\bar{\lambda} \bar{\bbeta}^\top.
$$
We take the dot product of the first equation with $\bar{\bbeta}$ and the last equation with $\bbeta$:
$$
\bar{\bbeta}^\top \bX \bbeta = \lambda \bar{\bbeta}^\top \bbeta \qquad \text{and } \qquad \bar{\bbeta}^\top \bX \bbeta = \bar{\lambda}\bar{\bbeta}^\top \bbeta.
$$
Then we have the equality $\lambda\bar{\bbeta}^\top \bbeta = \bar{\lambda} \bar{\bbeta}^\top\bbeta$. Since $\bar{\bbeta}^\top\bbeta = (\bc-i\bd)^\top(\bc+i\bd) = \bc^\top\bc+\bd^\top\bd$ is a real number,  the imaginary part of $\lambda$ is zero and $\lambda$ is real.
\end{proof}

\index{Symmetric matrix}
\begin{lemma}[Symmetric matrix property 2 of 4: orthogonal eigenvectors]\label{lemma:orthogonal-eigenvectors}
The eigenvectors  corresponding to distinct eigenvalues of any symmetric matrix are orthogonal so that we can normalize eigenvectors to make them orthonormal since $\bX\bbeta = \lambda \bbeta \underrightarrow{\text{ leads to } } \bX\frac{\bbeta}{\normtwo{\bbeta}} = \lambda \frac{\bbeta}{\normtwo{\bbeta}}$ which corresponds to the same eigenvalue.
\end{lemma}
\begin{proof}[of Lemma~\ref{lemma:orthogonal-eigenvectors}]\index{Orthogonal}
Suppose eigenvalues $\lambda_1$ and $\lambda_2$ correspond to eigenvectors $\bbeta_1$ and $\bbeta_2$, respectively, such that $\bX\bbeta_1=\lambda \bbeta_1$ and $\bX\bbeta_2 = \lambda_2\bbeta_2$. We have the following equality:
$$
\bX\bbeta_1=\lambda_1 \bbeta_1 
\qquad\implies\qquad 
\bbeta_1^\top \bX =\lambda_1 \bbeta_1^\top 
\qquad\implies\qquad
\bbeta_1^\top \bX \bbeta_2 =\lambda_1 \bbeta_1^\top\bbeta_2,
$$
and 
$$
\bX\bbeta_2 = \lambda_2\bbeta_2 
\qquad\implies\qquad
\bbeta_1^\top\bX\bbeta_2 = \lambda_2\bbeta_1^\top\bbeta_2,
$$
which implies $\lambda_1 \bbeta_1^\top\bbeta_2=\lambda_2\bbeta_1^\top\bbeta_2$. Since the eigenvalues $\lambda_1\neq \lambda_2$, the eigenvectors are orthogonal.
\end{proof}

For any matrix multiplication, the rank of the multiplication result is does not exceed the rank of the inputs. However, the symmetric matrix $\bX^\top \bX$ is rather special in that the rank of $\bX^\top \bX$ is equal to that of $\bX$ which will be used in the proof of the singular value decomposition in the sequel. 
\begin{lemma}[Rank of $\bX\bY$]\label{lemma:rankAB}
Let $\bX\in \real^{n\times p}$ and $\bY\in \real^{p\times k}$. Then the matrix multiplication $\bX\bY\in \real^{n\times k}$ has $\rank$($\bX\bY$)$\leq$min($\rank$($\bX$), $\rank$($\bY$)).
\end{lemma}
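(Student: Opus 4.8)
The plan is to bound $\mathrm{rank}(\bX\bY)$ by each factor's rank separately and then take the minimum. First I would control $\mathrm{rank}(\bX\bY)$ by $\mathrm{rank}(\bX)$ using the column space view of matrix multiplication established in Appendix~\ref{appendix:cr-decomposition}: writing $\bY=[\by_1, \by_2, \ldots, \by_k]$ by columns, each column of $\bX\bY$ equals $\bX\by_j$, which is a linear combination of the columns of $\bX$. Hence every column of $\bX\bY$ lies in $\cspace(\bX)$, so $\cspace(\bX\bY)\subseteq\cspace(\bX)$, and therefore $\mathrm{rank}(\bX\bY)=\dim\cspace(\bX\bY)\leq\dim\cspace(\bX)=\mathrm{rank}(\bX)$.

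For the second bound I would pass to the transpose rather than repeat the row argument verbatim. Since $(\bX\bY)^\top=\bY^\top\bX^\top$, applying the first step to the product $\bY^\top\bX^\top$ gives $\mathrm{rank}(\bY^\top\bX^\top)\leq\mathrm{rank}(\bY^\top)$. Invoking the fact that a matrix and its transpose share the same rank (Lemma~\ref{lemma:equal-dimension-rank}, i.e., row rank equals column rank), this reads $\mathrm{rank}(\bX\bY)\leq\mathrm{rank}(\bY)$. Combining the two inequalities yields $\mathrm{rank}(\bX\bY)\leq\min\{\mathrm{rank}(\bX), \mathrm{rank}(\bY)\}$, as claimed.

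There is no genuine obstacle here; the only point requiring care is that both bounds rest on previously established structural facts rather than on direct computation — specifically the column space view of matrix multiplication and the equality of row and column rank. I would cite these explicitly so that the short argument stays self-contained within the logical order of the text.
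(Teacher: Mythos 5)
Your proposal is correct and follows essentially the same route as the paper: the bound $\mathrm{rank}(\bX\bY)\leq\mathrm{rank}(\bX)$ comes from the column-space containment $\cspace(\bX\bY)\subseteq\cspace(\bX)$, and the bound by $\mathrm{rank}(\bY)$ comes from the corresponding row-space containment, which your transpose argument merely repackages (both versions ultimately rest on the equality of row and column rank). Your explicit citation of that equality is a slight gain in rigor over the paper's terser phrasing, but the underlying argument is identical.
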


\begin{proof}[of Lemma~\ref{lemma:rankAB}]
For matrix multiplication $\bX\bY$, we have 
\begin{itemize}
\item  All rows of $\bX\bY$ are combination of the rows of $\bY$, the row space of $\bX\bY$ is a subset of the row space of $\bY$. Thus $\rank$($\bX\bY$)$\leq$$\rank$($\bY$).

\item All columns of $\bX\bY$ are combination of columns of $\bX$, the column space of $\bX\bY$ is a subset of the column space of $\bX$. Thus $\rank$($\bX\bY$)$\leq$$\rank$($\bX$).
\end{itemize}

Therefore we have, $\rank$($\bX\bY$)$\leq$min($\rank$($\bX$), $\rank$($\bY$)).
\end{proof}

For the third property of symmetric matrix, we need the definition of similar matrices and the property about eigenvalues of similar matrices (see Lemma~\ref{lemma:eigenvalue-similar-matrices}).

\index{Symmetric matrix}
\begin{lemma}[Symmetric matrix property 3 of 4: orthonormal eigenvectors for duplicate eigenvalue]\label{lemma:eigen-multiplicity}
If $\bX$ has a duplicate eigenvalue $\lambda_i$ with multiplicity $k\geq 2$, then there exist $k$ orthonormal eigenvectors corresponding to $\lambda_i$.
\end{lemma}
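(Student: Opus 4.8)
The plan is to deduce the result directly from the Schur Decomposition (Theorem~\ref{theorem:schur-decomposition}), which is applicable because a symmetric matrix has only real eigenvalues by Property~1 of the present section. First I would write $\bX = \bQ\bU\bQ^\top$ with $\bQ$ orthogonal and $\bU$ upper triangular. Taking transposes and using $\bX = \bX^\top$ gives $\bQ\bU\bQ^\top = \bQ\bU^\top\bQ^\top$, hence $\bU = \bU^\top$; an upper triangular matrix equal to its (lower triangular) transpose must be diagonal, so $\bU = \bLambda = \diag(\lambda_1, \ldots, \lambda_n)$ and $\bX = \bQ\bLambda\bQ^\top$.

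Next I would identify the diagonal entries of $\bLambda$ as the eigenvalues of $\bX$ counted with multiplicity. Since $\bX$ and $\bLambda$ are similar (Lemma~\ref{lemma:eigenvalue-similar-matrices}), they share the same characteristic polynomial, and the characteristic polynomial of a diagonal matrix is $\prod_j (\lambda - \lambda_j)$; thus an eigenvalue $\lambda_i$ of multiplicity $k$ occurs exactly $k$ times among the diagonal entries. For each index $j$ with $\lambda_j = \lambda_i$, the $j$-th column $\bq_j$ of $\bQ$ satisfies $\bX\bq_j = \bQ\bLambda\bQ^\top\bq_j = \bQ\bLambda\be_j = \lambda_i\bq_j$, so $\bq_j$ is an eigenvector for $\lambda_i$; and because the columns of an orthogonal matrix are orthonormal, these $k$ columns furnish the required $k$ orthonormal eigenvectors.

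The main obstacle is the multiplicity bookkeeping in the middle step: I must be careful that ``multiplicity $k$'' is read as algebraic multiplicity and argue rigorously that it equals the number of diagonal slots occupied by $\lambda_i$, rather than merely asserting it. The similar-matrices lemma handles this cleanly through the characteristic polynomial, but it is precisely the place where a careless argument could silently conflate algebraic and geometric multiplicity, which is exactly the gap this lemma is meant to close.

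As a self-contained alternative that avoids quoting the full triangularization, I would instead use deflation: pick a unit eigenvector $\bu_1$ for $\lambda_i$, extend it to an orthonormal basis $\bP = [\bu_1, \bp_2, \ldots, \bp_n]$, and check---using symmetry, since $\bu_1^\top \bX \bp_j = (\bX\bu_1)^\top \bp_j = \lambda_i \bu_1^\top\bp_j = 0$---that $\bP^\top\bX\bP$ is block diagonal with a scalar $\lambda_i$ block and a symmetric $(n-1)\times(n-1)$ block having $\lambda_i$ as an eigenvalue of multiplicity $k-1$ (by the determinant computation of Lemma~\ref{lemma:submatrix-same-eigenvalue}). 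Induction on the dimension then produces $k-1$ orthonormal eigenvectors for $\lambda_i$ in the smaller space, which pull back through $\bP$ and combine orthogonally with $\bu_1$ to give the $k$ orthonormal eigenvectors.
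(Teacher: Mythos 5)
Your proposal is correct, and your primary route is genuinely different from the paper's. The paper proves this lemma by deflation: it takes one unit eigenvector $\bbeta_{i1}$ for $\lambda_i$, extends it to an orthonormal basis $\bP_1=[\bbeta_{i1},\bY_1]$, uses symmetry to get the block form $\bP_1^\top\bX\bP_1=\mathrm{diag}(\lambda_i,\bB)$ with $\bB=\bY_1^\top\bX\bY_1$, shows via the determinant factorization $\det(\bX-\lambda\bI)=(\lambda_i-\lambda)\det(\bB-\lambda\bI_{n-1})$ that $\lambda_i$ is still an eigenvalue of $\bB$ when $k\geq 2$, lifts an eigenvector $\bn$ of $\bB$ back to the eigenvector $\bY_1\bn$ of $\bX$ orthogonal to $\bbeta_{i1}$, and iterates --- which is exactly your ``self-contained alternative.'' Your main argument instead invokes the Schur decomposition, forces $\bU$ to be diagonal by symmetry, and reads the $k$ orthonormal eigenvectors off the columns of $\bQ$; this is legitimate and non-circular here, since the paper's proof of Theorem~\ref{theorem:schur-decomposition} relies only on Lemmas~\ref{lemma:determinant-intermezzo} and~\ref{lemma:submatrix-same-eigenvalue} and not on the spectral theorem, and the paper itself remarks that Schur decomposition gives an alternative proof of spectral decomposition. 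What the Schur route buys is brevity --- it effectively delivers the full orthonormal eigenbasis at once, with the multiplicity claim reduced to comparing characteristic polynomials of similar matrices; what the paper's deflation route buys is a construction that produces the $k$ eigenvectors one at a time without presupposing the triangularization, which is why the paper places it among the four elementary ``symmetric matrix properties'' from which the spectral theorem is then assembled. One presentational nit: Lemma~\ref{lemma:eigenvalue-similar-matrices} as stated only asserts that similar matrices share eigenvalues and rank, not that they share characteristic polynomials with multiplicity; the identity you actually need, $\det(\bLambda-\lambda\bI)=\det\bigl(\bQ^\top(\bX-\lambda\bI)\bQ\bigr)=\det(\bX-\lambda\bI)$, follows instead from the determinant facts in Lemma~\ref{lemma:determinant-intermezzo}, so cite that rather than the similarity lemma at that step.
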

\begin{proof}[of Lemma~\ref{lemma:eigen-multiplicity}]
We note that there is at least one eigenvector $\bbeta_{i1}$ corresponding to $\lambda_i$. And for such an eigenvector $\bbeta_{i1}$, we can always find additional $n-1$ orthonormal vectors $\by_2, \by_3, \ldots, \by_n$ so that $\{\bbeta_{i1}, \by_2, \by_3, \ldots, \by_n\}$ forms an orthonormal basis in $\real^n$. Put the $\by_2, \by_3, \ldots, \by_n$ into matrix $\bY_1$ and $\{\bbeta_{i1}, \by_2, \by_3, \ldots, \by_n\}$ into matrix $\bP_1$:
$$
\bY_1=[\by_2, \by_3, \ldots, \by_n] \qquad \text{and} \qquad \bP_1=[\bbeta_{i1}, \bY_1].
$$
We then have
$$
\bP_1^\top\bX\bP_1 = \begin{bmatrix}
\lambda_i &\bzero \\
\bzero & \bY_1^\top \bX\bY_1
\end{bmatrix} = \begin{bmatrix}
\lambda_i &\bzero \\
\bzero & \bB
\end{bmatrix}. \qquad (\text{Let $\bB=\bY_1^\top \bX\bY_1$})
$$
As a result, $\bX$ and $\bP_1^\top\bX\bP_1$ are similar matrices such that they have the same eigenvalues since $\bP_1$ is nonsingular (even orthogonal here, see Lemma~\ref{lemma:eigenvalue-similar-matrices}). We obtain 
$$
\det(\bP_1^\top\bX\bP_1 - \lambda\bI_n) =~\footnote{By the fact that if matrix $\bM$ has a block formulation: $\bM=\tiny\begin{bmatrix}
\bA & \bB \\
\bC & \bD 
\end{bmatrix}$, then $\det(\bM) = \det(\bA)\det(\bD-\bC\bA^{-1}\bB)$.
}~
(\lambda_i - \lambda )\det(\bY_1^\top \bX\bY_1 - \lambda\bI_{n-1}).
$$
If $\lambda_i$ has  a multiplicity $k\geq 2$, then the term $(\lambda_i-\lambda)$ occurs $k$ times in the polynomial from the determinant $\det(\bP_1^\top\bX\bP_1 - \lambda\bI_n)$, i.e., the term occurs $k-1$ times in the polynomial from $\det(\bY_1^\top \bX\bY_1 - \lambda\bI_{n-1})$. In other words, $\det(\bY_1^\top \bX\bY_1 - \lambda_i\bI_{n-1})=0$ and $\lambda_i$ is an eigenvalue of $\bY_1^\top \bX\bY_1$. 

Let $\bB=\bY_1^\top \bX\bY_1$. Since $\det(\bB-\lambda_i\bI_{n-1})=0$, the null space of $\bB-\lambda_i\bI_{n-1}$ is not none. Suppose $(\bB-\lambda_i\bI_{n-1})\bn = \bzero$, i.e., $\bB\bn=\lambda_i\bn$ and $\bn$ is an eigenvector of $\bB$. 

From $
\bP_1^\top\bX\bP_1 = 
\footnotesize
\begin{bmatrix}
\lambda_i &\bzero \\
\bzero & \bB
\end{bmatrix},
$
we have $
\bX\bP_1 
\footnotesize
\begin{bmatrix}
z \\
\bn 
\end{bmatrix} 
\normalsize
= 
\bP_1
\footnotesize
\begin{bmatrix}
\lambda_i &\bzero \\
\bzero & \bB
\end{bmatrix}
\begin{bmatrix}
z \\
\bn 
\end{bmatrix}$, where $z$ is any scalar. From the left side of this equation, we have 
\begin{equation}\label{equation:spectral-pro4-right}
\begin{aligned}
\bX\bP_1 
\begin{bmatrix}
z \\
\bn 
\end{bmatrix} 
&=
\begin{bmatrix}
\lambda_i\bbeta_{i1}, \bX\bY_1
\end{bmatrix}
\begin{bmatrix}
z \\
\bn 
\end{bmatrix} 
=\lambda_iz\bbeta_{i1} + \bX\bY_1\bn.
\end{aligned}
\end{equation}
And from the right side of the equation, we have 
\begin{equation}\label{equation:spectral-pro4-left}
\begin{aligned}
\bP_1
\begin{bmatrix}
\lambda_i &\bzero \\
\bzero & \bB
\end{bmatrix}
\begin{bmatrix}
z \\
\bn 
\end{bmatrix}
&=
\begin{bmatrix}
\bbeta_{i1} & \bY_1
\end{bmatrix}
\begin{bmatrix}
\lambda_i &\bzero \\
\bzero & \bB
\end{bmatrix}
\begin{bmatrix}
z \\
\bn 
\end{bmatrix}
=
\begin{bmatrix}
\lambda_i\bbeta_{i1} & \bY_1\bB 
\end{bmatrix}
\begin{bmatrix}
z \\
\bn 
\end{bmatrix}\\
&= \lambda_i z \bbeta_{i1} + \bY_1\bB \bn 
=\lambda_i z \bbeta_{i1} + \lambda_i \bY_1 \bn, 
\end{aligned}
\end{equation}
where the last equality follows from the fact that $\bB \bn=\lambda_i\bn$.
Combining Equation~\eqref{equation:spectral-pro4-left} and Equation~\eqref{equation:spectral-pro4-right}, we obtain 
$$
\bX\bY_1\bn = \lambda_i\bY_1 \bn,
$$
which means that $\bY_1\bn$ is an eigenvector of $\bX$ corresponding to the eigenvalue $\lambda_i$ (the same eigenvalue corresponding to $\bbeta_{i1}$). Since $\bY_1\bn$ is a combination of $\by_2, \by_3, \ldots, \by_n$, which are orthonormal to $\bbeta_{i1}$, the vector $\bY_1\bn$ can be chosen to be orthonormal to $\bbeta_{i1}$.

To conclude, if we have one eigenvector $\bbeta_{i1}$ corresponding to $\lambda_i$ whose multiplicity is $k\geq 2$, we could construct the second eigenvector by choosing one vector from the null space of $(\bB-\lambda_i\bI_{n-1})$ constructed above. Suppose now, we have constructed the second eigenvector $\bbeta_{i2}$, which is orthonormal to $\bbeta_{i1}$.  
For such eigenvectors $\bbeta_{i1}$ and $\bbeta_{i2}$, we can always find additional $n-2$ orthonormal vectors $\by_3, \by_4, \ldots, \by_n$ so that $\{\bbeta_{i1},\bbeta_{i2}, \by_3, \by_4, \ldots, \by_n\}$ forms an orthonormal basis in $\real^n$. Put the vectors $\by_3, \by_4, \ldots, \by_n$ into matrix $\bY_2$ and $\{\bbeta_{i1},\bbeta_{i2},  \by_3, \by_4, \ldots, \by_n\}$ into matrix $\bP_2$:
$$
\bY_2=[\by_3, \by_4, \ldots, \by_n] \qquad \text{and} \qquad \bP_2=[\bbeta_{i1}, \bbeta_{i2},\bY_1].
$$
We then have
$$
\bP_2^\top\bX\bP_2 = 
\begin{bmatrix}
\lambda_i & 0 &\bzero \\
0& \lambda_i &\bzero \\
\bzero & \bzero & \bY_2^\top \bX\bY_2
\end{bmatrix}
=
\begin{bmatrix}
\lambda_i & 0 &\bzero \\
0& \lambda_i &\bzero \\
\bzero & \bzero & \bC
\end{bmatrix},
$$
where $\bC=\bY_2^\top \bX\bY_2$ such that $\det(\bP_2^\top\bX\bP_2 - \lambda\bI_n) = (\lambda_i-\lambda)^2 \det(\bC - \lambda\bI_{n-2})$. If the multiplicity of $\lambda_i$ is $k\geq 3$, $\det(\bC - \lambda_i\bI_{n-2})=0$ and the null space of $\bC - \lambda_i\bI_{n-2}$ is not none so that we can still find a vector from the null space of $\bC - \lambda_i\bI_{n-2}$ and $\bC\bn = \lambda_i \bn$. Now we can construct a vector $
\footnotesize
\begin{bmatrix}
z_1 \\
z_2\\
\bn
\end{bmatrix}\in \real^n $, where $z_1$ and $z_2$ are any scalar values, such that 
$$
\bX\bP_2\begin{bmatrix}
z_1 \\
z_2\\
\bn
\end{bmatrix} = \bP_2 
\begin{bmatrix}
\lambda_i & 0 &\bzero \\
0& \lambda_i &\bzero \\
\bzero & \bzero & \bC
\end{bmatrix}
\begin{bmatrix}
z_1 \\
z_2\\
\bn
\end{bmatrix}.
$$
Similarly, from the left side of the above equation we will get $\lambda_iz_1\bbeta_{i1} +\lambda_iz_2\bbeta_{i2}+\bX\bY_2\bn$. From the right side of the above equation we will get $\lambda_iz_1\bbeta_{i1} +\lambda_i z_2\bbeta_{i2}+\lambda_i\bY_2\bn$. As a result, 
$$
\bX\bY_2\bn = \lambda_i\bY_2\bn,
$$
where $\bY_2\bn$ is an eigenvector of $\bX$ and is orthogonal to $\bbeta_{i1}$ and $\bbeta_{i2}$. And it is easy to construct the eigenvector to be orthonormal to the first two.

The process can go on, and finally, we will find $k$ orthonormal eigenvectors corresponding to $\lambda_i$. 

Actually, the dimension of the null space of $\bP_1^\top\bX\bP_1 -\lambda_i\bI_n$ is equal to the multiplicity $k$. It also follows that if the multiplicity of $\lambda_i$ is $k$, there cannot be more than $k$ orthogonal eigenvectors corresponding to $\lambda_i$. Otherwise, it will lead to the contradiction that we could find more than $n$ orthogonal eigenvectors.
\end{proof}

The proof of the Spectral Theorem~\ref{theorem:spectral_theorem}  is evident from the lemmas above. Also, we can use Schur decomposition to prove the existence of spectral decomposition (see Theorem~\ref{theorem:schur-decomposition}).

\index{Rank}
\index{Symmetric matrix}
\begin{lemma}[Symmetric matrix property 4 of 4: rank of symmetric matrix]\label{lemma:rank-of-symmetric}
If $\bX$ is an $n\times n$ real symmetric matrix, then rank($\bX$) =
the total number of nonzero eigenvalues of $\bX$. 
In particular, $\bX$ has full rank if and only if $\bX$ is nonsingular. Furthermore, $\cspace(\bX)$ is the linear space spanned by the eigenvectors of $\bX$ that correspond to nonzero eigenvalues.
\end{lemma}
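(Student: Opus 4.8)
The plan is to reduce every claim to the spectral decomposition of Theorem~\ref{theorem:spectral_theorem}, which writes $\bX = \bQ\bLambda\bQ^\top$ with $\bQ = [\bq_1, \bq_2, \ldots, \bq_n]$ orthogonal and $\bLambda = \diag(\lambda_1, \lambda_2, \ldots, \lambda_n)$ holding the (real) eigenvalues of $\bX$. First I would handle the rank count. Since $\bQ$ is orthogonal, hence nonsingular, the matrices $\bX$ and $\bLambda$ are similar, so by Lemma~\ref{lemma:eigenvalue-similar-matrices} they share the same rank. A diagonal matrix plainly has rank equal to the number of its nonzero diagonal entries, and those entries are exactly the eigenvalues of $\bX$. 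Therefore $rank(\bX)$ equals the total number of nonzero eigenvalues, counted with multiplicity.

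The ``full rank iff nonsingular'' clause is then immediate: $\bX$ has full rank $n$ exactly when all $n$ eigenvalues are nonzero, i.e. when $0$ is not an eigenvalue of $\bX$, which by the list of equivalences of nonsingularity (zero being an eigenvalue characterizes singularity) is precisely the condition that $\bX$ be nonsingular.

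For the column-space statement I would work with the rank-one expansion $\bX = \sum_{i=1}^{n} \lambda_i \bq_i \bq_i^\top$ supplied by the decomposition. For any $\bv \in \real^n$ we have $\bX\bv = \sum_{i=1}^{n} \lambda_i (\bq_i^\top \bv)\, \bq_i$, which is a linear combination of only those $\bq_i$ with $\lambda_i \neq 0$; this gives the inclusion $\cspace(\bX) \subseteq span\{\bq_i : \lambda_i \neq 0\}$. Conversely, whenever $\lambda_j \neq 0$ we have $\bX(\bq_j/\lambda_j) = \bq_j$, so each such $\bq_j$ lies in $\cspace(\bX)$, yielding the reverse inclusion and hence equality. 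Because the $\bq_i$ are orthonormal, the eigenvectors associated with the nonzero eigenvalues are linearly independent and thus form a basis of $\cspace(\bX)$, consistent with the rank count already obtained.

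I do not anticipate a serious obstacle; the only points that need care are (i) tracking multiplicities so that ``number of nonzero eigenvalues'' matches the rank exactly, which the spectral theorem guarantees by producing a full orthonormal eigenbasis, and (ii) verifying both inclusions in the column-space argument rather than only the easy forward one. The similarity-invariance of rank (Lemma~\ref{lemma:eigenvalue-similar-matrices}) does the heavy lifting, so once the decomposition is in hand the proof is essentially a bookkeeping exercise.
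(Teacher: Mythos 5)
Your proof is correct and follows essentially the same route as the paper: both reduce to the spectral decomposition $\bX=\bQ\bLambda\bQ^\top$ and identify $rank(\bX)$ with $rank(\bLambda)$, the only cosmetic difference being that you invoke rank-invariance under similarity (Lemma~\ref{lemma:eigenvalue-similar-matrices}) where the paper sandwiches $rank(\bX)=rank(\bLambda)$ with two applications of $rank(\bA\bB)\leq\min(rank(\bA),rank(\bB))$ from Lemma~\ref{lemma:rankAB}. Your two-inclusion argument for $\cspace(\bX)=span\{\bq_i:\lambda_i\neq 0\}$ is a welcome addition, since the paper asserts that part of the lemma but its proof stops after the full-rank equivalence.
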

\begin{proof}[of Lemma~\ref{lemma:rank-of-symmetric}]
For any symmetric matrix $\bX$, we have $\bX$, in spectral form, as $\bX = \bQ \bLambda\bQ^\top$ and also $\bLambda = \bQ^\top\bX\bQ$. Since we have shown in Lemma~\ref{lemma:rankAB}  that the rank of the matrix multiplication $\rank$($\bX\bY$)$\leq$min($\rank$($\bX$), $\rank$($\bY$)). Therefore, we have
\begin{itemize}
\item From $\bX = \bQ \bLambda\bQ^\top$, we have $\rank(\bX) \leq \rank(\bQ \bLambda) \leq \rank(\bLambda)$;

\item From $\bLambda = \bQ^\top\bX\bQ$, we have $\rank(\bLambda) \leq \rank(\bQ^\top\bX) \leq \rank(\bX)$, 
\end{itemize}

The inequalities above give us a contradiction. And thus $\rank(\bX) = \rank(\bLambda)$, which is the total number of nonzero eigenvalues.

Since $\bX$ is nonsingular if and only if all of its eigenvalues are nonzero, $\bX$ has full rank if and only if $\bX$ is nonsingular.
\end{proof}

\begin{theoremHigh}[Unique power decomposition of PD/PSD matrices]\label{theorem:unique-factor-pd}
Any $n\times n$ PSD (resp. PD) matrix $\bA$ can be \textbf{uniquely} factored as a power of another PSD (resp. PD) matrix $\bB$ such that $\bA =\bB^k$ with $k=\{1,2,\ldots\}$, where $\rank(\bB)=\rank(\bA)$. 
\end{theoremHigh}
\begin{proof}[of Theorem~\ref{theorem:unique-factor-pd}]
We first show the existence of such a positive semidefinite matrix $\bB$ that satisfies $\bA = \bB^k$. 
\paragraph{Existence.} Since $\bA$ is symmetric and positive semidefinite, its spectral decomposition  is given by $\bA = \bQ\bLambda\bQ^\top$, where $\bQ$ is orthogonal and $\bLambda$ is diagonal containing the  eigenvalues of $\bA$. Since eigenvalues of PSD matrices are nonnegative (Theorem~\ref{theorem:eigen_charac}), the $k$-th square root of $\bLambda$ exists. We can define $\bB \triangleq\bQ\bLambda^{1/k}\bQ^\top$ such that $\bA = \bB^k$, where $\bB$ is apparently PSD.

\paragraph{Uniqueness.} 
Suppose the factorization is not unique. Then, there exist two positive definite matrices $\bB_1$ and $\bB_2$ such that
$$
\bA = \bB_1^k = \bB_2^k.
$$
Their spectral decompositions are given by 
$$
\bB_1 = \bQ_1 \bLambda_1\bQ_1^\top \qquad \text{and} \qquad \bB_2 = \bQ_2 \bLambda_2\bQ_2^\top.
$$
We notice that $\bLambda_1^k$ and $\bLambda_2^k$ contain the eigenvalues of $\bA$, and both eigenvalues of $\bB_1$ and $\bB_2$ contained in $\bLambda_1$ and $\bLambda_2$ are nonnegative (since both $\bB_1$ and $\bB_2$ are  PSD). Without loss of generality, we suppose $\bLambda_1=\bLambda_2=\bLambda^{1/k}$, and $\bLambda=\diag(\lambda_1,\lambda_2, \ldots, \lambda_n)$ such that $\lambda_1\geq \lambda_2 \geq \ldots \geq \lambda_n$. Utilizing the equation  $\bB_1^k = \bB_2^k$, we have
$$
\bQ_1 \bLambda \bQ_1^\top = \bQ_2 \bLambda \bQ_2^\top  \quad\implies\quad \bQ_2^\top\bQ_1 \bLambda = \bLambda \bQ_2^\top\bQ_1.
$$
Let $\bZ \triangleq \bQ_2^\top\bQ_1 $ (which is orthogonal), this implies that $\bLambda$ and $\bZ$ commute, and $\bZ$ must be a block-diagonal matrix whose partitioning conforms to the block structure of $\bLambda$. This results in $\bLambda^{1/k} = \bZ\bLambda^{1/k}\bZ^\top$ and
$$
\bB_2 = \bQ_2 \bLambda^{1/k}\bQ_2^\top = \bQ_2 \bQ_2^\top\bQ_1\bLambda^{1/k} \bQ_1^\top\bQ_2 \bQ_2^\top=\bB_1.
$$
Thus, the decomposition is unique. In a similar manner, we can establish the unique decomposition of a PD matrix $\bA =  \bB^k$, where $\bB$ is also PD. 
For a more detailed discussion, see \citet{koeber2006unique, horn2012matrix}, which provides an alternative proof using polynomials.
\end{proof}

\subsection{Singular Value Decomposition (SVD)}\label{section:SVD}

Employing QR decomposition, we factor the matrix into an orthogonal matrix. 
Unlike the factorization into a single orthogonal matrix, singular value decomposition (SVD) yields two orthogonal matrices. We illustrate the result of SVD in the following theorem.

\index{Full and reduced}
\begin{theoremHigh}[Reduced SVD for rectangular matrices]\label{theorem:reduced_svd_rectangular}
Any real $n\times p$ matrix $\bX$ with rank $r$ admits the following decomposition:
$$
\bX = \bU \bSigma \bV^\top,
$$ 
where $\bSigma\in \real^{r\times r}$ is a diagonal matrix $\bSigma=\diag(\sigma_1, \sigma_2 \ldots, \sigma_r)$ with $\sigma_1 \geq \sigma_2 \geq \ldots \geq \sigma_r$, and:
\begin{itemize}
\item The elements $\sigma_i$'s are the nonzero \textit{singular values} of $\bX$, in the meantime, they are the (positive) square roots of the nonzero \textit{eigenvalues} of $\trans{\bX} \bX$ and $ \bX \trans{\bX}$.

\item Columns of $\bU\in \real^{n\times r}$ contain the $r$ eigenvectors of $\bX\bX^\top$ corresponding to the $r$ nonzero eigenvalues of $\bX\bX^\top$. 

\item Columns of $\bV\in \real^{p\times r}$ contain the $r$ eigenvectors of $\bX^\top\bX$ corresponding to the $r$ nonzero eigenvalues of $\bX^\top\bX$. 

\item Moreover, the columns of $\bU$ and $\bV$ are called the \textit{left and right singular vectors} of $\bX$, respectively. 

\item Furthermore, the columns of $\bU$ and $\bV$ are orthonormal (by Spectral Theorem~\ref{theorem:spectral_theorem}). 
\end{itemize}

In particular, we can write out the matrix decomposition $\bX = \bU \bSigma \bV^\top = \sum_{i=1}^r \sigma_i \bu_i \bv_i^\top$, which is a sum of $r$ rank-one matrices.
\end{theoremHigh}
If we append additional $n-r$ silent columns that are orthonormal to the $r$ eigenvectors of $\bX\bX^\top$, just like the silent columns in QR decomposition, we will have an orthogonal matrix $\bU\in \real^{n\times n}$. A similar procedure applies to the columns of $\bV$. 
The comparison between the reduced and full SVD is shown in Figure~\ref{fig:svd-comparison}, where white entries are zero, and blue entries are not necessarily zero.

\begin{figure}[h!]
\centering  
\vspace{-0.35cm}  
\subfigtopskip=2pt  
\subfigbottomskip=2pt  
\subfigcapskip=-5pt  
\subfigure[Reduced SVD decomposition.]{\label{fig:svdhalf}
\includegraphics[width=0.47\linewidth]{./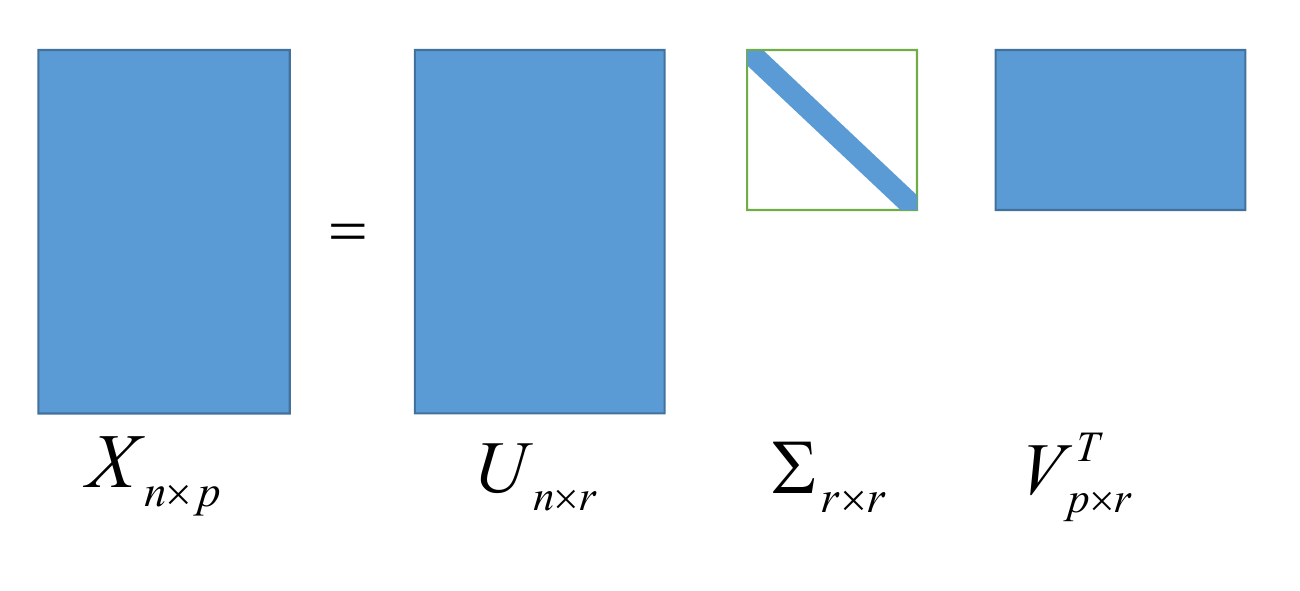}}
\quad 
\subfigure[Full SVD decomposition.]{\label{fig:svdall}
\includegraphics[width=0.47\linewidth]{./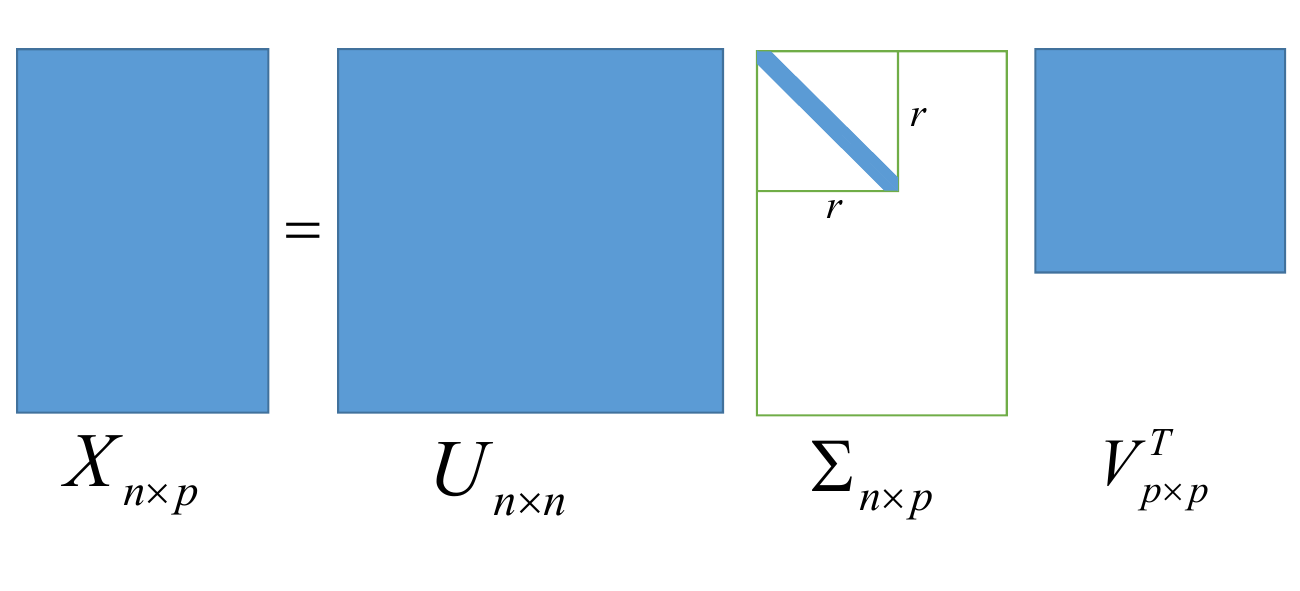}}
\caption{Comparison between the reduced and full SVD. White entries are zero, and blue entries are not necessarily zero}
\label{fig:svd-comparison}
\end{figure}

\subsection*{Existence of the SVD}
To prove the existence of the SVD, we need to use the following lemmas. We may notice that the singular values are the square roots of the eigenvalues of $\bX^\top\bX$. While, negative values do not have square roots such that its eigenvalues must be nonnegative.
\index{Nonnegative eigenvalues}
\begin{lemma}[Nonnegative eigenvalues of $\bX^\top \bX$]\label{lemma:nonneg-eigen-ata}
For any matrix $\bX\in \real^{n\times p}$, $\bX^\top \bX$ has nonnegative eigenvalues.
\end{lemma}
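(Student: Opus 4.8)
The plan is to show that every eigenvalue $\lambda$ of $\bX^\top\bX$ satisfies $\lambda \geq 0$ by working directly from the eigenvalue equation and exploiting the quadratic form associated with $\bX^\top\bX$. The key observation is that $\bX^\top\bX$ is a symmetric matrix (since $(\bX^\top\bX)^\top = \bX^\top\bX$), so by the Spectral Theorem~\ref{theorem:spectral_theorem} (p.~\pageref{theorem:spectral_theorem}) all its eigenvalues are real, and it remains only to rule out negative values.

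First I would take an arbitrary eigenpair $(\lambda, \bu)$ of $\bX^\top\bX$ with $\bu \neq \bzero$, so that $\bX^\top\bX\bu = \lambda\bu$. The central idea is to form the scalar $\bu^\top\bX^\top\bX\bu$ in two ways. On one hand, substituting the eigenvalue equation gives $\bu^\top\bX^\top\bX\bu = \bu^\top(\lambda\bu) = \lambda\,\bu^\top\bu = \lambda\,||\bu||^2$. On the other hand, recognizing the left-hand side as a squared norm yields $\bu^\top\bX^\top\bX\bu = (\bX\bu)^\top(\bX\bu) = ||\bX\bu||^2 \geq 0$. Equating these two expressions gives $\lambda\,||\bu||^2 = ||\bX\bu||^2 \geq 0$.

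Since $\bu$ is an eigenvector, it is nonzero, so $||\bu||^2 > 0$. Dividing through gives
$$
\lambda = \frac{||\bX\bu||^2}{||\bu||^2} \geq 0,
$$
which establishes that every eigenvalue of $\bX^\top\bX$ is nonnegative.

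Honestly, there is no serious obstacle here: the argument is a short and standard application of the identity $\bu^\top\bX^\top\bX\bu = ||\bX\bu||^2$, which is exactly the same trick already used in the proof of Lemma~\ref{lemma:rank-of-ata-x} to show $\nspace(\bX) = \nspace(\bX^\top\bX)$. The only point worth stating explicitly is that symmetry of $\bX^\top\bX$ (via the Spectral Theorem) guarantees the eigenvalues are real in the first place, so the inequality $\lambda \geq 0$ is meaningful rather than comparing a complex number to zero.
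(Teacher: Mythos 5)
Your proof is correct and follows essentially the same route as the paper: take an eigenpair $(\lambda,\bu)$, left-multiply the eigenvalue equation by $\bu^\top$ to obtain $\lambda\,\bu^\top\bu = \bu^\top\bX^\top\bX\bu = ||\bX\bu||^2 \geq 0$, and conclude $\lambda \geq 0$. Your version is in fact slightly more careful than the paper's, since you explicitly use $||\bu||^2 > 0$ (rather than merely $\geq 0$) and note that symmetry guarantees the eigenvalues are real.
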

\begin{proof}[of Lemma~\ref{lemma:nonneg-eigen-ata}]
Given an eigenvalue and its corresponding eigenvector $\lambda$ and $\bbeta$ of $\bX^\top \bX$, we have
$$
\bX^\top \bX \bbeta = \lambda \bbeta \leadto \bbeta^\top \bX^\top \bX \bbeta = \lambda \bbeta^\top\bbeta. 
$$
Since $\bbeta^\top \bX^\top \bX \bbeta  = \normtwo{\bX \bbeta}^2 \geq 0$ and $\bbeta^\top\bbeta \geq 0$. It then follows that  $\lambda \geq 0$.
\end{proof}

Since $\bX^\top\bX$ has nonnegative eigenvalues, we then can define the singular value $\sigma\geq 0$ of $\bX$ such that $\sigma^2$ is the eigenvalue of $\bX^\top\bX$, i.e., {$\bX^\top\bX \bv = \sigma^2 \bv$}. This is essential for establishing the existence of SVD.

We have shown in Lemma~\ref{lemma:rankAB} that $\rank$($\bX\bY$)$\leq$min($\rank$($\bX$), $\rank$($\bY$)).
However, the symmetric matrix $\bX^\top \bX$ is rather special in that the rank of $\bX^\top \bX$ is equal to $\rank(\bX)$. And the proof is provided in the following lemma.

\begin{lemma}[Rank of $\bX^\top \bX$]\label{lemma:rank-of-ata-x} 
For any matrix $\bX$, $\bX^\top \bX$ and $\bX$ have same rank.
Similarly, $\bX \bX^{\top}$ and $\bX$ have same rank:
$$
\rank(\bX) = \rank(\bX^\top\bX) =\rank(\bX\bX^\top).
$$
\end{lemma}
\begin{proof}[of Lemma~\ref{lemma:rank-of-ata-x}]
Let $\boldeta\in \nspace(\bX)$, that is, a vector in the null space of $\bX$ such that $\bX\bbeta=\bzero$. Then,
$$
\bX\boldeta  = \bzero 
\qquad \implies \qquad
\bX^\top\bX \boldeta =\bzero, 
$$
which means $\boldeta\in \nspace(\bX) \implies \boldeta \in \nspace(\bX^\top \bX)$. Therefore, $\nspace(\bX) \in \nspace(\bX^\top\bX)$. 

Conversely, suppose $\boldeta \in \nspace(\bX^\top\bX)$, we have 
$$
\bX^\top \bX\boldeta = \bzero\implies \boldeta^\top \bX^\top \bX\boldeta = 0\implies \normtwo{\bX\boldeta}^2 = 0 \implies \bX\boldeta=\bzero.
$$
This shows that $\boldeta\in \nspace(\bX^\top \bX) \implies \boldeta\in \nspace(\bX)$. Therefore, $\nspace(\bX^\top\bX) \in\nspace(\bX) $. Combining both inclusions, we conclude: $\nspace(\bX) = \nspace(\bX^\top\bX)$ and $\dim(\nspace(\bX)) = \dim(\nspace(\bX^\top\bX))$. Applying the fundamental theorem of linear algebra in Theorem~\ref{theorem:fundamental-linear-algebra}, we conclude that $\bX^\top \bX$ and $\bX$ have the same rank.
\end{proof}
By applying the same reasoning to $\bX^\top$, we can also show that $\bX\bX^\top$ and $\bX$ share the same rank. 
The ordinary least squares estimate is a result of this conclusion.

In the form of SVD, we claimed the matrix $\bX$ is a sum of $r$ rank-one matrices, where $r$ is the number of nonzero singular values. And the number of nonzero singular values is actually equal to the rank of the matrix.

\index{Rank}
\begin{lemma}\label{lemma:rank-equal-singular}
The number of nonzero singular values of a matrix $\bX$ equals the rank of $\bX$.
\end{lemma}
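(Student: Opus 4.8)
The plan is to connect the nonzero singular values of $\bX$ to the nonzero eigenvalues of the symmetric matrix $\bX^\top\bX$, and then to relate the count of those eigenvalues back to $\mathrm{rank}(\bX)$ using results already established in the excerpt. Recall that by definition (and by Lemma~\ref{lemma:nonneg-eigen-ata}, p.~\pageref{lemma:nonneg-eigen-ata}) each singular value $\sigma_i$ satisfies $\sigma_i^2 = \lambda_i$ for some eigenvalue $\lambda_i \geq 0$ of $\bX^\top\bX$, and this correspondence $\lambda \mapsto \sqrt{\lambda}$ is a bijection between the nonzero eigenvalues of $\bX^\top\bX$ and the nonzero singular values of $\bX$. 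Hence the number of nonzero singular values of $\bX$ equals the number of nonzero eigenvalues of $\bX^\top\bX$.

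First I would observe that $\bX^\top\bX$ is symmetric, since $(\bX^\top\bX)^\top = \bX^\top\bX$. This lets me invoke the spectral machinery: by Lemma~\ref{lemma:rank-of-symmetric} (Rank of Symmetric Matrix, p.~\pageref{lemma:rank-of-symmetric}), the rank of a real symmetric matrix equals the total number of its nonzero eigenvalues. Applying this to $\bX^\top\bX$ gives
$$
\mathrm{rank}(\bX^\top\bX) = \#\{\text{nonzero eigenvalues of } \bX^\top\bX\} = \#\{\text{nonzero singular values of } \bX\}.
$$

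Next I would close the loop with Lemma~\ref{lemma:rank-of-ata-x} (Rank of $\bX^\top\bX$, p.~\pageref{lemma:rank-of-ata-x}), which states that $\bX^\top\bX$ and $\bX$ have the same rank. Combining the two displayed identities yields that the number of nonzero singular values of $\bX$ equals $\mathrm{rank}(\bX^\top\bX) = \mathrm{rank}(\bX)$, which is exactly the claim.

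Honestly, there is no serious obstacle here, since every ingredient is available as a prior lemma; the only point requiring a moment's care is articulating cleanly that the map $\lambda \mapsto \sqrt{\lambda}$ preserves the property of being nonzero, so that counting nonzero eigenvalues is genuinely the same as counting nonzero singular values (rather than, say, accidentally conflating multiplicities or sign conventions). Once that bijection is stated precisely, the result follows by directly chaining the two rank lemmas.
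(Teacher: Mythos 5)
Your proof is correct and follows essentially the same route as the paper: it applies Lemma~\ref{lemma:rank-of-symmetric} to the symmetric matrix $\bX^\top\bX$ to count nonzero eigenvalues, then invokes Lemma~\ref{lemma:rank-of-ata-x} to identify $\mathrm{rank}(\bX^\top\bX)$ with $\mathrm{rank}(\bX)$. Your only addition is spelling out explicitly the bijection $\lambda \mapsto \sqrt{\lambda}$ between nonzero eigenvalues and nonzero singular values, which the paper leaves implicit.
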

\begin{proof}[of Lemma~\ref{lemma:rank-equal-singular}]
The rank of any symmetric matrix (here $\bX^\top\bX$) equals the number of nonzero eigenvalues (with repetitions) by Lemma~\ref{lemma:rank-of-symmetric}. So the number of nonzero singular values equals the rank of $\bX^\top \bX$. By Lemma~\ref{lemma:rank-of-ata-x}, $\bX^\top \bX$ and $\bX$ have the same rank, so the number of nonzero singular values equals  the rank of $\bX$.
\end{proof}

We are now ready to prove the existence of SVD.
\begin{proof}[of Theorem~\ref{theorem:reduced_svd_rectangular}]
Since $\bX^\top \bX$ is a symmetric matrix, by Spectral Theorem~\ref{theorem:spectral_theorem}  and Lemma~\ref{lemma:nonneg-eigen-ata}, there exists an orthogonal matrix $\bV$ such that
$$
{\bX^\top \bX = \bV \bSigma^2 \bV^\top},
$$
where $\bSigma$ is a diagonal matrix containing the singular values of $\bX$, i.e., $\bSigma^2$ contains the eigenvalues of $\bX^\top \bX$.
Specifically, $\bSigma=\diag(\sigma_1, \sigma_2 \ldots, \sigma_r)$ and $\{\sigma_1^2, \sigma_2^2, \ldots, \sigma_r^2\}$ are the nonzero eigenvalues of $\bX^\top \bX$ with $r$ being the rank of $\bX$. I.e., $\{\sigma_1, \ldots, \sigma_r\}$ are the singular values of $\bX$. In this case, $\bV\in \real^{p\times r}$.
Start from {$\bX^\top\bX \bv_i = \sigma_i^2 \bv_i$}, $\forall\, i \in \{1, 2, \ldots, r\}$, i.e., the eigenvector $\bv_i$ of $\bX^\top\bX$ corresponding to $\sigma_i^2$:

1. Multiply both sides by $\bv_i^\top$:
$$
\bv_i^\top\bX^\top\bX \bv_i = \sigma_i^2 \bv_i^\top \bv_i \leadto \norm{\bX\bv_i}^2 = \sigma_i^2 \leadto \norm{\bX\bv_i}=\sigma_i.
$$

2. Multiply both sides by $\bX$:
$$
\bX\bX^\top\bX \bv_i = \sigma_i^2 \bX \bv_i \leadtosmall \bX\bX^\top \frac{\bX \bv_i }{\sigma_i}= \sigma_i^2 \frac{\bX \bv_i }{\sigma_i} \leadtosmall \bX\bX^\top \bu_i = \sigma_i^2 \bu_i,
$$
where we notice this form can find the eigenvector of $\bX\bX^\top$ corresponding to $\sigma_i^2$, and the eigenvector is $\bX \bv_i$. Since the length of $\bX \bv_i$ is $\sigma_i$, we then define $\bu_i \triangleq \frac{\bX \bv_i }{\sigma_i}$ with norm 1.
These $\bu_i$'s are orthogonal because $(\bX\bv_i)^\top(\bX\bv_j)=\bv_i^\top\bX^\top\bX\bv_j=\sigma_j^2 \bv_i^\top\bv_j=0$. That is,
$$
{\bX \bX^\top = \bU \bSigma^2 \bU^\top}.
$$
Since $\bX\bv_i = \sigma_i\bu_i$, we have 
$$
[\bX\bv_1, \bX\bv_2, \ldots, \bX\bv_r] = [ \sigma_1\bu_1,  \sigma_2\bu_2, \ldots,  \sigma_r\bu_r]\leadto
\bX\bV = \bU\bSigma,
$$
Since $\bV\bV^\top \neq \bI$, we cannot obtain the reduced SVD directly. 
Suppose we append the semi-orthogonal matrix  $\bV$ into an orthogonal matrix $\widetilde{\bV}=[\bV, \bV_2]$, and append the semi-orthogonal matrix $\bU$ into an orthogonal matrix $\widetilde{\bU}=[\bU, \bU_2]$. We then obtain
$$
\bA \widetilde{\bV}= \widetilde{\bU} \widetilde{\bSigma} ,\gap \text{where}\gap 
\widetilde{\bSigma} 
= \begin{bmatrix}
	\bSigma & \bzero  \\
	\bzero &\bzero
\end{bmatrix}
\quad\implies\quad
\bA = \widetilde{\bU} \widetilde{\bSigma}\widetilde{\bV}^\top,
$$
i.e., the full SVD (since $\widetilde{\bV}\widetilde{\bV}^\top=\bI$).
Simplifying the product, we get:
$
\bA =\bU \bSigma \bV^\top + \bU_2 \cdot \bzero \cdot \bV_2^\top = \bU \bSigma \bV^\top,
$
i.e., the reduced SVD,
which completes the proof.
\end{proof}

\subsection*{Eckart-Young-Mirsky Theorem}
Suppose we want to approximate the rank-$r$ matrix $\bX\in \real^{n\times p}$ by a rank-$k$ matrix $\bY$ ($k<r$). The approximation is measured using the Frobenius norm (Definition~\ref{definition:frobernius-in-svd}):\index{Low-rank approximation}
$$
\bY = \mathop{\arg\min}_{\bY} \, \norm{\bX - \bY}_F.
$$
Then  we can recover the optimal rank-$k$ approximation using the following theorem \citep{stewart1993early}.

\index{Truncated SVD}
\begin{theoremHigh}[Eckart-Young-Mirsky theorem w.r.t. Frobenius norm\index{Eckart-Young-Mirsky theorem}]\label{theorem:young-theorem_frob}
Given a matrix $\bX\in \real^{n\times p}$ and $1\leq k\leq \rank(\bX)=r$, and let $\bX_k$ be the \textit{truncated SVD} (TSVD) of $\bX$ with the largest $k$ terms, i.e., $\bX_k = \sum_{i=1}^{k} \sigma_i\bu_i\bv_i^\top$ from the SVD of $\bX=\sum_{i=1}^{r} \sigma_i\bu_i\bv_i^\top$ by zeroing out the $r-k$ trailing singular values of $\bX$. Then $\bX_k$ is the optimal rank-$k$ approximation to $\bX$ in terms of the Frobenius norm, satisfying $\normf{\bX-\bX_k}^2 = \sum_{i\geq k+1}\sigma_i^2$.
\end{theoremHigh}

\subsection*{Four Orthonormal Bases in SVD}\label{appendix:property-svd}

\index{Fundamental theorem of linear algebra}
For any matrix, we have the following property:
\begin{itemize}
	\item $\nspace(\bX)$ is the orthogonal complement of the row space $\cspace(\bX^\top)$ in $\real^p$: $\dim(\nspace(\bX))+\dim(\cspace(\bX^\top))=p$;
	
	\item $\nspace(\bX^\top)$ is the orthogonal complement of the column space $\cspace(\bX)$ in $\real^n$: $\dim(\nspace(\bX^\top))+\dim(\cspace(\bX))=n$;
\end{itemize}
This is called the fundamental theorem of linear algebra and is also known as the rank-nullity theorem  (Theorem~\ref{theorem:fundamental-linear-algebra}).
In specific, the construction of SVD yields a set of orthonormal bases for the four subspaces in the fundamental theorem of linear algebra. 
To show this, we require the following fact.
\begin{lemma}[Subspace of $\bX^\top \bX$ and $\bX\bX^\top$]\label{lemma:rank-of-ttt}
Let $\bX\in \real^{n\times p}$ be given. Then,
\begin{itemize}
\item The column space of $\bX^\top \bX$ is identical  to the column space of $\bX^\top$ (i.e., row space of $\bX$): $\cspace(\bX^\top\bX)=\cspace(\bX^\top)$; this also shows $\nspace(\bX^\top\bX)=\nspace(\bX)$ by fundamental theorem of linear algebra in Theorem~\ref{theorem:fundamental-linear-algebra}.
\item The column space of $\bX\bX^\top$ is identical  to the column space of $\bX$: $\cspace(\bX\bX^\top)=\cspace(\bX)$; again, this also shows
$\nspace(\bX\bX^\top)=\nspace(\bX^\top)$.
\end{itemize}
\end{lemma}
\begin{proof}[of Lemma~\ref{lemma:rank-of-ttt}]
Let $\bbeta\in \nspace(\bX)$, we have 
$
\bX\bbeta  = \bzero \implies \bX^\top\bX \bbeta =\bzero, 
$
i.e., $\bbeta\in \nspace(\bX) \implies \bbeta \in \nspace(\bX^\top \bX)$. Therefore, $\nspace(\bX) \subseteq \nspace(\bX^\top\bX)$. 
Furthermore, let $\bbeta \in \nspace(\bX^\top\bX)$, we have 
$$
\bX^\top \bX\bbeta = \bzero\implies \bbeta^\top \bX^\top \bX\bbeta = 0\implies \normtwo{\bX\bbeta}^2 = 0 \implies \bX\bbeta=\bzero, 
$$
i.e., $\bbeta\in \nspace(\bX^\top \bX) \implies \bbeta\in \nspace(\bX)$. Therefore, $\nspace(\bX^\top\bX) \subseteq\nspace(\bX) $. 
As a result, by ``sandwiching," it follows that  
$
\nspace(\bX) = \nspace(\bX^\top\bX).
$
According to the fundamental theorem of linear algebra in Theorem~\ref{theorem:fundamental-linear-algebra}, we have 
$$
\cspace(\bX^\top)=\cspace(\bX^\top\bX).
$$
Applying the same process to $\bX^\top$ leads to the second part of the lemma.
\end{proof}

\index{Orthonormal basis}
\index{Fundamental theorem}
\begin{theoremHigh}[Four orthonormal bases in SVD]\label{theorem:svd-four-orthonormal-Basis}
Given the full SVD of matrix $\bX = \bU \bSigma \bV^\top$, where $\bU=[\bu_1, \bu_2, \ldots,\bu_n]$ and $\bV=[\bv_1, \bv_2, \ldots, \bv_p]$ are the column partitions of $\bU$ and $\bV$, respectively. Then, we have the following property:
\begin{itemize}
\item $\{\bv_1, \bv_2, \ldots, \bv_r\} $ is an orthonormal basis of the row space,  $\cspace(\bX^\top)$;

\item $\{\bv_{r+1},\bv_{r+2}, \ldots, \bv_p\}$ is an orthonormal basis of the null space, $\nspace(\bX)$;

\item $\{\bu_1,\bu_2, \ldots,\bu_r\}$ is an orthonormal basis of the column space, $\cspace(\bX)$;

\item $\{\bu_{r+1}, \bu_{r+2},\ldots,\bu_n\}$ is an orthonormal basis of the left null space, $\nspace(\bX^\top)$. 
\end{itemize}
\end{theoremHigh}
\begin{proof}[of Theorem~\ref{theorem:svd-four-orthonormal-Basis}]
From Lemma~\ref{lemma:rank-of-symmetric}, for symmetric matrix $\bX^\top\bX$, its column space $\cspace(\bX^\top\bX)$ is spanned by the eigenvectors. 
Therefore, the set $\{\bv_1,\bv_2 \ldots, \bv_r\}$ forms an orthonormal basis for $\cspace(\bX^\top\bX)$.
Thus, $\{\bv_1, \bv_2,\ldots, \bv_r\}$ also serves as an orthonormal basis for $\cspace(\bX^\top)$ by Lemma~\ref{lemma:rank-of-ttt}. 

Furthermore, the space spanned by $\{\bv_{r+1}, \bv_{r+2},\ldots, \bv_n\}$ is an orthogonal complement to the space spanned by $\{\bv_1,\bv_2, \ldots, \bv_r\}$. Hence, $\{\bv_{r+1},\bv_{r+2}, \ldots, \bv_n\}$ constitutes an orthonormal basis for $\nspace(\bX)$. 

Applying this process to $\bX\bX^\top$ proves the remaining claims in the lemma. 
Alternatively, we can see that $\{\bu_1,\bu_2, \ldots,\bu_r\}$ forms a basis for the column space of $\bX$ by Lemma~\ref{lemma:column-basis-from-row-basis}~\footnote{As a recap, for any matrix $\bX$, let $\{\br_1, \br_2, \ldots, \br_r\}$ be a set of vectors in $\real^p$, which forms a basis for the row space, then $\{\bX\br_1, \bX\br_2, \ldots, \bX\br_r\}$ is a basis for the column space of $\bX$.}, since $\bu_i = \frac{\bX\bv_i}{\sigma_i},\, \forall\, i \in\{1, 2, \ldots, r\}$. 
\end{proof}
The relationship among the four subspaces is demonstrated in Figure~\ref{fig:lafundamental3-SVD}, where $\bX$ maps each row basis vector $\bv_i$ into the column basis vector $\bu_i$ by $\sigma_i\bu_i=\bX\bv_i$ for all $i\in \{1, 2, \ldots, r\}$.

\begin{figure}[h!]
\centering
\includegraphics[width=0.95\textwidth]{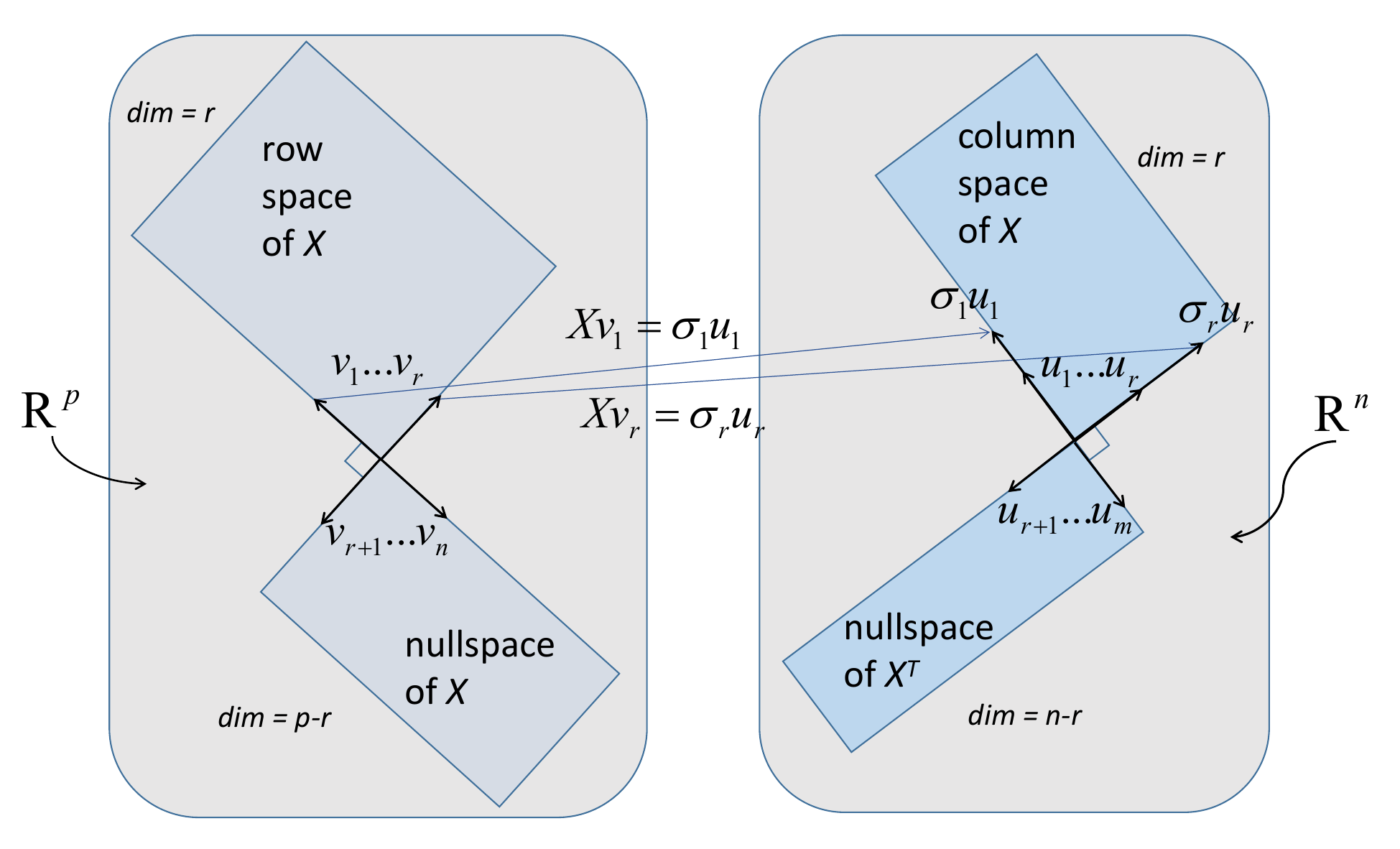}
\caption{Orthonormal bases that diagonalize $\bX$ from SVD. $\{\bv_1, \bv_2, \ldots, \bv_r\} $ is an orthonormal basis of $\cspace(\bX^\top)$, and $\{\bu_1,\bu_2, \ldots,\bu_r\}$ is an orthonormal basis of $\cspace(\bX)$. Connection between the row space basis and column space basis: $\bX$ transfers the row basis $\bv_i$ into the column basis $\bu_i$ by $\sigma_i\bu_i=\bX\bv_i$ for all $i\in \{1, 2, \ldots, r\}$.}
\label{fig:lafundamental3-SVD}
\end{figure}

\section{Pseudo-Inverse}\label{section:pseudo-inverse}
If the matrix $\bX$ is nonsingular, the solution to the linear system $\by = \bX\bbeta$ can be directly obtained by taking the inverse of $\bX$, yielding $\widehatbbeta = \bX^{-1}\by$. 
However, when $\bX$ is not square or is singular, the inverse does not exist. In such cases, we can still define a generalized inverse known as the \textit{pseudo-inverse}, represented as a $p\times n$ matrix denoted by $\bX^+$.

Before discussing the pseudo-inverse in detail, we will briefly introduce related concepts such as one-sided inverses, generalized inverses, and reflexive generalized inverses.
That said, readers who are already familiar with these ideas may choose to skip ahead without losing the overall understanding of the pseudo-inverse.

\subsection{One-Sided Inverse}
We begin by providing a formal definition of the one-sided inverse:
\begin{definition}[One-sided inverse]\index{One-sided inverse}\label{definition:one_side_inverse}
For any matrix $\bX\in \real^{n\times p}$, a matrix $\bX_L^{-1}$ is called a \textit{left inverse} of $\bX$ if it satisfies the condition:
$$
\bX_L^{-1} \bX = \bI_p.
$$
In such cases, the matrix $\bX$ is said to be \textit{left-invertible}.
Similarly, a matrix $\bX_R^{-1}$ is referred to as a \textit{right inverse} of $\bX$ if the following holds:
$$
\bX \bX_R^{-1}= \bI_n.~\footnote{The superscript $-1$ in $\bX_L^{-1}$ and $\bX_R^{-1}$ signifies the one-sided inverse of $\bX$ and should not be interpreted as the inverse of $\bX_L$ or $\bX_R$.}
$$
Here, $\bX$ is said to be \textit{right-invertible}.
\end{definition}

\begin{lemma}[One-sided invertible]\label{theorem:one-sided-invertible}
For any matrix $\bX\in \real^{n\times p}$, the following hold:
\begin{itemize}
\item  $\bX$ is left-invertible if and only if $\bX$ has full column rank (which implies $n\geq p$);
\item  $\bX$ is right-invertible if and only if $\bX$ has full row rank (which implies $n\leq p$).
\end{itemize}
\end{lemma}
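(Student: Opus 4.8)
The plan is to prove each biconditional by constructing an explicit one-sided inverse in the ``if'' direction and by a rank-counting argument in the ``only if'' direction. I would handle the left-invertible claim in full and then obtain the right-invertible claim by applying the same reasoning to $\bX^\top$, exploiting the duality that a right inverse of $\bX$ is the transpose of a left inverse of $\bX^\top$.

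For the forward direction of the left-invertible claim, suppose $\bX$ has full column rank $p$. Since $\mathrm{rank}(\bX)\le \min\{n,p\}$, this already forces $p\le n$. By Lemma~\ref{lemma:rank-of-ata-x}, $\bX^\top\bX$ and $\bX$ share the same rank $p$; as $\bX^\top\bX\in\real^{p\times p}$, it has full rank and is therefore nonsingular by the list of equivalences for nonsingular matrices. I would then exhibit the left inverse explicitly as $\bX_L^{-1}=(\bX^\top\bX)^{-1}\bX^\top$ and verify $\bX_L^{-1}\bX=(\bX^\top\bX)^{-1}\bX^\top\bX=\bI_p$, which is precisely the OLS-style construction already seen in Lemma~\ref{lemma:ols}.

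For the converse, suppose a left inverse $\bX_L^{-1}$ exists with $\bX_L^{-1}\bX=\bI_p$. Applying the rank inequality of Lemma~\ref{lemma:rankAB} yields $p=\mathrm{rank}(\bI_p)=\mathrm{rank}(\bX_L^{-1}\bX)\le \mathrm{rank}(\bX)\le \min\{n,p\}\le p$, so every inequality must be an equality. This forces $\mathrm{rank}(\bX)=p$ (full column rank) and simultaneously $p\le n$. The right-invertible claim then follows by replacing $\bX$ with $\bX^\top$ throughout, or equivalently by restating the construction with $\bX\bX^\top$ in place of $\bX^\top\bX$ to get $\bX_R^{-1}=\bX^\top(\bX\bX^\top)^{-1}$.

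I do not anticipate a serious obstacle, since both directions rest entirely on results established earlier in the excerpt; the only point requiring care is keeping the rank bookkeeping consistent. In particular, the crucial move is to remember that $\mathrm{rank}(\bX)\le \min\{n,p\}$ is what converts ``full column rank $=p$'' into the dimension constraint $n\ge p$ (and, dually, ``full row rank $=n$'' into $n\le p$), so I would make sure to state this bound explicitly rather than leave it implicit.
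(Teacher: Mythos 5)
Your proposal is correct and follows essentially the same route as the paper: the explicit left inverse $(\bX^\top\bX)^{-1}\bX^\top$ via Lemma~\ref{lemma:rank-of-ata-x} for the forward direction, and the rank inequality $p=\mathrm{rank}(\bX_L^{-1}\bX)\leq \mathrm{rank}(\bX)$ for the converse, with the right-invertible case handled by symmetry. Your extra care in spelling out $\mathrm{rank}(\bX)\leq\min\{n,p\}$ to extract $n\geq p$ is a welcome clarification but does not change the argument.
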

\begin{proof}[of Lemma~\ref{theorem:one-sided-invertible}]
Suppose $\bX$ has full column rank. Then the matrix $\bX^\top \bX \in \real^{p\times p}$ attains full rank (by Lemma~\ref{lemma:rank-of-ata-x}). Therefore, $(\bX^\top \bX)^{-1}(\bX^\top \bX) = \bI_p$. This implies that \textcolor{black}{$(\bX^\top \bX)^{-1}\bX^\top$ acts as  a left inverse of $\bX$}. 

Conversely, suppose $\bX$ is left-invertible with $\bX_L^{-1} \bX = \bI_p$. Since all rows of $\bX_L^{-1} \bX$ are  combinations of the rows of $\bX$, meaning the row space of $\bX_L^{-1} \bX$ is a subset of the row space of $\bX$. We then have $\rank(\bX) \geq \rank(\bX_L^{-1} \bX) =\rank(\bI_p) = p$, indicating $\rank(\bX)=p$, and $\bX$ has full column rank. 

Similarly, we can show that $\bX$ is right-invertible if and only if $\bX$ has full row rank, and \textcolor{black}{$\bX^\top (\bX\bX^\top)^{-1}$ serves as a right inverse of $\bX$}.
\end{proof}

From the  proof above, we see that $(\bX^\top \bX)^{-1}\bX^\top$ is  a specific left inverse of $\bX$ when it has full column rank. 
Similarly, $\bX^\top (\bX\bX^\top)^{-1}$ acts as a specific right inverse of $\bX$ when it has full row rank.
However, obtaining the inverse of a nonsingular $p\times p$ matrix involves a complex process, requiring $2p^3$ floating-point operations (flops) \citep{lu2021numerical}.
In our case, finding the inverses of $\bX^\top\bX$ and $\bX\bX^\top$ would require $2p^3$ and $2n^3$ flops, respectively.  
A more straightforward approach to acquire a one-sided inverse involves using elementary operations.

Assume that $\bX\in \real^{n\times p}$ has full column rank. 
We can apply \textit{row elementary operations}, represented by a matrix $\bE\in \real^{n\times n}$, to the augmented matrix $[\bX, \bI_n]$, resulting in
\begin{equation}\label{equation:onesided-1}
\bE \begin{bmatrix}
\bX & \bI_n
\end{bmatrix}=
\begin{bmatrix}
\bI_p & \bG \\
\bzero & \bZ
\end{bmatrix},
\end{equation}
where $\bG \in \real^{p\times n}$, $\bI_n$ is an $n\times n$ identity matrix, $\bI_p$ is a $p\times p$ identity matrix, and $\bZ$ is an $(n-p)\times n$ matrix.
Then, it can be easily verified that $\bG\bX = \bI_p$, establishing $\bG$ as a left inverse of $\bX$.

Similarly, consider $\bX\in \real^{n\times p}$ with full row rank. By applying \textit{column elementary operations}, denoted by $\bE \in \real^{p\times p}$, to the matrix $[\bX^\top, \bI_p]^\top$, we obtain
\begin{equation}\label{equation:onesided-2}
\begin{bmatrix}
\bX \\
\bI_p
\end{bmatrix} \bE=
\begin{bmatrix}
\bI_n & \bzero\\
\bG & \bZ
\end{bmatrix},
\end{equation}
where $\bZ$ is a $p\times (p-n)$ matrix.
Then, $\bG \in \real^{p\times n}$ is a right inverse of $\bX$.

More generally, the following two propositions provide the methods for discovering more left inverses or right inverses of a matrix.
\begin{proposition}[Finding left inverse]\index{Left inverse}
Suppose $\bX\in \real^{n\times p}$ is left-invertible ($n\geq p$). Then, 
$$
\bX_L^{-1} = [(\bX_1^{-1} -\bY \bX_2 \bX_1^{-1}) ,  \bY]\bE,
$$
is a left inverse of $\bX$, where $\bY\in \real^{p\times (n-p)}$ can be any matrix, and $\bE\bX = \begin{bmatrixfoot}
\bX_1 \\
\bX_2
\end{bmatrixfoot}$ is the row elementary transformation of $\bX$ such that $\bX_1 \in \real^{p\times p}$ is invertible (since $\bX$ has full column rank $p$) and $\bE\in \real^{n\times n}$.
\end{proposition}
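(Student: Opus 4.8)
The plan is to verify directly that the proposed matrix satisfies the left-inverse condition $\bX_L^{-1}\bX = \bI_p$, after first establishing that the decomposition $\bE\bX = \begin{bmatrix}\bX_1 \\ \bX_2\end{bmatrix}$ with $\bX_1$ invertible actually exists. The whole argument is a short existence step followed by a one-line block cancellation.

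First I would justify the existence of $\bE$. Since $\bX$ is left-invertible, Lemma~\ref{theorem:one-sided-invertible} guarantees that $\bX$ has full column rank $p$, so among its $n$ rows there are $p$ that are linearly independent. Choosing a permutation matrix $\bP$ that moves these $p$ rows into the first $p$ positions, the top block $\bX_1\in\real^{p\times p}$ of $\bP\bX$ consists of $p$ linearly independent rows and is therefore nonsingular. Because a permutation matrix is an invertible product of elementary row operations, we may take $\bE=\bP$ (or, more generally, any invertible $\bE$ producing an invertible top block) and write $\bE\bX = \begin{bmatrix}\bX_1\\\bX_2\end{bmatrix}$ with $\bX_1$ invertible and $\bX_2\in\real^{(n-p)\times p}$.

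Next I would carry out the block multiplication. Substituting the definition of $\bX_L^{-1}$ and the block form of $\bE\bX$,
$$
\bX_L^{-1}\bX = \begin{bmatrix}(\bX_1^{-1}-\bY\bX_2\bX_1^{-1}) & \bY\end{bmatrix}\begin{bmatrix}\bX_1\\\bX_2\end{bmatrix} = (\bX_1^{-1}-\bY\bX_2\bX_1^{-1})\bX_1 + \bY\bX_2,
$$
and expanding the right-hand side gives $\bI_p - \bY\bX_2\bX_1^{-1}\bX_1 + \bY\bX_2 = \bI_p - \bY\bX_2 + \bY\bX_2 = \bI_p$, which is exactly the required condition.

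There is no serious obstacle here: the computation reduces to a single cancellation of the two $\bY\bX_2$ terms once the block form is in place. The only point genuinely needing care is the justification that an $\bE$ with invertible top block $\bX_1$ exists, which rests entirely on the full-column-rank hypothesis delivered by the one-sided invertibility lemma. To round out the statement I would also note that letting $\bY$ range over all of $\real^{p\times(n-p)}$ yields a whole family of left inverses, and that the special left inverse $(\bX^\top\bX)^{-1}\bX^\top$ identified earlier arises as one particular member of this family.
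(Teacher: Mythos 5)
Your proof is correct. The paper actually states this proposition without supplying a proof of its own (it only checks afterwards that the specific left inverse $\bG$ from the elementary-operation construction is the special case $\bY=\bzero$), so your direct verification is precisely the argument the paper leaves implicit: the existence of $\bE$ with invertible top block $\bX_1$ follows from full column rank via Lemma~\ref{theorem:one-sided-invertible}, and the block product
$$
\bX_L^{-1}\bX = \begin{bmatrix}(\bX_1^{-1}-\bY\bX_2\bX_1^{-1}) & \bY\end{bmatrix}\begin{bmatrix}\bX_1\\ \bX_2\end{bmatrix} = \bI_p - \bY\bX_2 + \bY\bX_2 = \bI_p
$$
closes the argument. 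The dimensions all match ($\bY\in\real^{p\times(n-p)}$, $\bX_2\in\real^{(n-p)\times p}$), and your closing remark that $(\bX^\top\bX)^{-1}\bX^\top$ and the $\bG$ of Equation~\eqref{equation:onesided-1} arise as members of this family is consistent with the paper's own follow-up discussion. No gaps.
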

We can verify that $\bG$ in Equation~\eqref{equation:onesided-1} is a specific left inverse of $\bX$ by setting $\bY=\bzero$. Since $\bE = \begin{bmatrixfoot}
\bG \\
*
\end{bmatrixfoot}$, $\bX_1 = \bI_p$, and $\bX_2 = \bzero$, we have 
$$
\bX_L^{-1} = [(\bX_1^{-1} -\bY \bX_2 \bX_1^{-1}), \bY]\bE = \bG + \bY\bZ = \bG,
$$
where the last equality follows from the assumption that $\bY=\bzero$.

\begin{proposition}[Finding right inverse]\index{Right inverse}
Suppose $\bX\in \real^{n\times p}$ is right-invertible ($n\leq p$). Then,
$$
\bX_R^{-1} = \bE\begin{bmatrix}
(\bX_1^{-1}-\bX_1^{-1}\bX_2\bY) \\
\bY
\end{bmatrix},
$$
is a right inverse of $\bX$, where $\bY\in \real^{(p-n)\times n}$ can be any matrix, and $\bX\bE = [\bX_1, \bX_2 ]$ is the column elementary transformation of $\bX$ such that $\bX_1 \in \real^{n\times n}$ is invertible (since $\bX$ has full row rank $n$) and $\bE\in \real^{p\times p}$.
\end{proposition}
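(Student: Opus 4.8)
The plan is to verify directly that $\bX\bX_R^{-1} = \bI_n$, mirroring the argument that established the left-inverse formula in the preceding proposition. First I would record the two structural facts supplied by the hypotheses: since $\bX$ has full row rank $n$ (which is equivalent to right-invertibility by Lemma~\ref{theorem:one-sided-invertible}), the column elementary transformation $\bE\in\real^{p\times p}$ is nonsingular and produces $\bX\bE = \begin{bmatrix} \bX_1 & \bX_2 \end{bmatrix}$ with $\bX_1\in\real^{n\times n}$ invertible and $\bX_2\in\real^{n\times(p-n)}$. The invertibility of $\bX_1$ is exactly what makes the block entry $\bX_1^{-1}-\bX_1^{-1}\bX_2\bY$ well defined.

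Next I would substitute the candidate $\bX_R^{-1}$ into the product $\bX\bX_R^{-1}$ and immediately absorb the leading factor $\bX\bE$ into its block form:
$$
\bX\bX_R^{-1} = (\bX\bE)\begin{bmatrix} \bX_1^{-1}-\bX_1^{-1}\bX_2\bY \\ \bY \end{bmatrix} = \begin{bmatrix} \bX_1 & \bX_2 \end{bmatrix}\begin{bmatrix} \bX_1^{-1}-\bX_1^{-1}\bX_2\bY \\ \bY \end{bmatrix}.
$$
Carrying out the block multiplication yields $\bX_1(\bX_1^{-1}-\bX_1^{-1}\bX_2\bY)+\bX_2\bY$, which expands to $\bI_n - \bX_2\bY + \bX_2\bY = \bI_n$. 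Hence $\bX_R^{-1}$ is a right inverse of $\bX$.

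The only point requiring genuine attention is the bookkeeping in the block product: I must check the dimensions are compatible (so that $\bY\in\real^{(p-n)\times n}$ multiplies correctly against $\bX_2$) and that the two copies of $\bX_2\bY$ appear with opposite signs and therefore cancel, leaving precisely the $n\times n$ identity. Because nothing beyond the invertibility of $\bX_1$ and a single cancellation is involved, there is no real obstacle here; moreover the computation holds for an \emph{arbitrary} $\bY$, which simultaneously explains the non-uniqueness of right inverses whenever $p>n$. As an alternative route, I note that the statement could instead be deduced from the companion left-inverse proposition by applying it to $\bX^\top$ and transposing, but the direct verification above is cleaner and self-contained.
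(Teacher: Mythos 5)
Your verification is correct and is exactly the computation the paper intends: substituting the candidate, using associativity to replace $\bX\bE$ by the block form $\begin{bmatrix}\bX_1 & \bX_2\end{bmatrix}$, and observing that the two copies of $\bX_2\bY$ cancel to leave $\bI_n$. The paper itself only states the proposition and checks the special case $\bY=\bzero$, so your direct argument for arbitrary $\bY$ fills in precisely the intended (and routine) details.
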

Similarly, we can verify that $\bG$ in Equation~\eqref{equation:onesided-2} is a specific right inverse of $\bX$ by setting $\bY=\bzero$. Since $\bE = [\bG, \bZ]$, $\bX_1=\bI_n$, and $\bX_2=\bzero$, we have 
$$
\bX_R^{-1} = \bE\begin{bmatrix}
(\bX_1^{-1}-\bX_1^{-1}\bX_2\bY) \\
\bY
\end{bmatrix} = \bG+\bZ\bY = \bG,
$$
where again the last equality holds because $\bY=\bzero$.

\subsection{Generalized Inverse (g-inverse)}
We mentioned previously that if the matrix $\bX$ is nonsingular,  the linear system $\by = \bX\bbeta$ can be easily solved using the inverse of $\bX$, resulting in $\widehatbbeta = \bX^{-1}\by$. However, for an $n\times p$ matrix $\bX$, the inverse does not exist if $\bX$ is neither square nor nonsingular. 

Nevertheless, when $\by$ lies in the column space of $\bX$, we can still determine the solution to the linear system. The association between the solution $\widehatbbeta$ and the target vector $\by$ is expressed by the \textit{generalized inverse} (or \textit{inner inverse or $\{1\}$-inverse}) of  $\bX$: $\widehatbbeta = \bX^-\by$ \citep{nashed1973generalized}.
\begin{definition}[Generalized inverse]\index{Generalized inverse}
Let $\bX \in \real^{n\times p}$ be a matrix of  rank $r$, where $r \leq p \leq n$. 
Then, a \textit{generalized inverse (g-inverse)} $\bX^- \in\real^{p\times n}$ of $\bX$ is a matrix that satisfies
$$
(C1) \qquad  \bX\bX^-\bX = \bX,
$$
or equivalently, 
$$
(C1.1) \qquad \bX\bX^-\by = \by
$$
for any vector $\by \in \cspace(\bX)$.
\end{definition}
To demonstrate  the equivalence between $(C1)$ and $(C1.1)$, that is, we want to show $\bX$ satisfies $(C1)$ if and only if it satisfies $(C1.1)$. For any $\by \in \cspace(\bX)$, a $\bbeta\in\real^p$ exists such that $\bX\bbeta = \by$. If $\bX$ and $\bX^-$ satisfy $(C1)$, then
$$
\bX\bX^-\bX \bbeta = \bX\bbeta 
\qquad \implies\qquad 
\bX\bX^-\by = \by,
$$
indicating that $\bX$ and $\bX^-$ also satisfy $(C1.1)$. Conversely, if $\bX$ and $\bX^-$ satisfy $(C1.1)$, then:
$$
\bX\bX^-\by = \by 
\qquad \implies\qquad  
\bX\bX^-\bX \bbeta = \bX\bbeta ,
$$
which implies $\bX$ and $\bX^-$ also satisfy $(C1)$.

Multiply on the left of $(C1)$ by $\bX^-$ and utilize the definition of the projection matrix in Definition~\ref{definition:projection-matrix} (i.e., an idempotent matrix), we obtain $\bX^-\bX\bX^-\bX = \bX^-\bX$ such that $ \bX^-\bX$ is idempotent, which implies $\bX^-\bX$ is a projection matrix (not necessarily  an orthogonal projection, i.e., a symmetric and idempotent matrix).
\begin{lemma}[Projection matrix from generalized inverse]\label{lemma:idempotent-of-ginverse}
For any matrix $\bX$  and any of its generalized inverse $\bX^-$, $\bX^-\bX$ is a projection matrix but not necessarily an orthogonal projection matrix. Same claim can be applied to $\bX\bX^-$ as well.
\end{lemma}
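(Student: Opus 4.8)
The plan is to verify the two defining requirements separately: idempotency (which, by the relaxed characterization of projection matrices, is all that is needed to conclude $\bX^-\bX$ and $\bX\bX^-$ are projections), and then to exhibit a concrete generalized inverse for which $\bX^-\bX$ fails to be symmetric, so that by Lemma~\ref{lemma:symmetric-projection-matrix} it cannot be an orthogonal projection.

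First I would establish idempotency. Starting from the defining property $(C1)$, namely $\bX\bX^-\bX = \bX$, I would multiply on the left by $\bX^-$ to obtain $\bX^-\bX\bX^-\bX = \bX^-\bX$, that is $(\bX^-\bX)^2 = \bX^-\bX$. Symmetrically, multiplying $(C1)$ on the right by $\bX^-$ gives $\bX\bX^-\bX\bX^- = \bX\bX^-$, i.e.\ $(\bX\bX^-)^2 = \bX\bX^-$. Thus both $\bX^-\bX$ and $\bX\bX^-$ are idempotent. Recall from the discussion following Definition~\ref{definition:projection-matrix} that the criterion $(P3)$ alone suffices to qualify a matrix as a projection onto its own column space; hence both products are projection matrices.

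The remaining point is that these projections need not be orthogonal. By Lemma~\ref{lemma:symmetric-projection-matrix}, a projection is orthogonal if and only if it is symmetric, so it suffices to produce a single example in which $\bX^-\bX$ is not symmetric. I would take the rank-one matrix $\bX = \begin{bmatrix} 1 & 0 \\ 0 & 0 \end{bmatrix}$ together with the candidate $\bX^- = \begin{bmatrix} 1 & 0 \\ 1 & 0 \end{bmatrix}$. A direct check shows $\bX\bX^-\bX = \bX$, so $\bX^-$ is a legitimate generalized inverse, while $\bX^-\bX = \begin{bmatrix} 1 & 0 \\ 1 & 0 \end{bmatrix}$ is idempotent but manifestly non-symmetric. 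This settles the claim for $\bX^-\bX$, and the analogous statement for $\bX\bX^-$ follows by the same reasoning (or by applying the result to $\bX^\top$).

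The computations here are entirely routine, so there is no genuine obstacle. The only point requiring a little care is the non-orthogonality claim: one must remember that the generalized inverse is highly non-unique (unlike the pseudo-inverse $\bX^+$), and it is precisely this freedom that permits $\bX^-\bX$ to be asymmetric. Had $\bX$ possessed full column rank, the identity $(C1)$ would force $\bX^-\bX = \bI_p$, which is symmetric; hence the counterexample must be built from a rank-deficient matrix, and I would flag this so the reader sees why the example is chosen that way.
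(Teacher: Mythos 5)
Your proof is correct and follows essentially the same route as the paper: multiply $(C1)$ on the left (resp.\ right) by $\bX^-$ to get idempotency of $\bX^-\bX$ (resp.\ $\bX\bX^-$), then invoke the relaxed criterion $(P3)$ to conclude these are projections. You go one step further than the paper by actually substantiating the ``not necessarily orthogonal'' clause with a valid counterexample ($\bX = \begin{bmatrix} 1 & 0 \\ 0 & 0 \end{bmatrix}$, $\bX^- = \begin{bmatrix} 1 & 0 \\ 1 & 0 \end{bmatrix}$ indeed satisfies $\bX\bX^-\bX=\bX$ and yields a non-symmetric idempotent $\bX^-\bX$), together with the correct observation that rank deficiency is needed since full column rank forces $\bX^-\bX=\bI_p$; the paper merely asserts this part without proof.
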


\begin{lemma}[Rank of generalized inverse]\label{proposition:rank-of-ginverse}
For any matrix $\bX \in \real^{n\times p}$ and any of its generalized inverse $\bX^- \in\real^{p\times n}$, the following inequality holds: 
$$
\rank(\bX^-) \geq \rank(\bX).
$$
Specifically, we also have $\rank(\bX)=\rank(\bX\bX^-)=\rank(\bX^-\bX)$.
\end{lemma}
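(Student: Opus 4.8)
The plan is to derive both claims directly from the defining relation $(C1)$, namely $\bX\bX^-\bX = \bX$, combined with the submultiplicativity of rank from Lemma~\ref{lemma:rankAB} (p.~\pageref{lemma:rankAB}), which gives $rank(\bA\bB) \leq \min(rank(\bA), rank(\bB))$ for any conformable $\bA,\bB$. The entire argument is a repeated ``sandwiching'' of ranks, so essentially no computation is needed; the only care required is in grouping the triple product $\bX\bX^-\bX$ correctly so as to apply the binary-product bound in the desired direction at each step.

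First I would establish the inequality $rank(\bX^-) \geq rank(\bX)$. Reading the left-hand side of $(C1)$ as $\bX\cdot(\bX^-\bX)$ and then $\bX^-\bX$ as $\bX^-\cdot\bX$, two applications of Lemma~\ref{lemma:rankAB} yield $rank(\bX\bX^-\bX) \leq rank(\bX^-\bX) \leq rank(\bX^-)$. Since $(C1)$ asserts $\bX\bX^-\bX = \bX$, the left side is exactly $rank(\bX)$, giving $rank(\bX) \leq rank(\bX^-)$ at once.

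Next I would prove the two equalities $rank(\bX) = rank(\bX\bX^-) = rank(\bX^-\bX)$. For the first, grouping $\bX\bX^-\bX$ as $(\bX\bX^-)\cdot\bX$ gives $rank(\bX\bX^-\bX) \leq rank(\bX\bX^-)$, while grouping $\bX\bX^-$ as $\bX\cdot\bX^-$ gives $rank(\bX\bX^-) \leq rank(\bX)$. Chaining these with $rank(\bX\bX^-\bX)=rank(\bX)$ from $(C1)$ produces $rank(\bX) \leq rank(\bX\bX^-) \leq rank(\bX)$, so by sandwiching $rank(\bX\bX^-)=rank(\bX)$. The analogous grouping of $\bX\bX^-\bX$ as $\bX\cdot(\bX^-\bX)$ together with $rank(\bX^-\bX)\leq rank(\bX)$ gives $rank(\bX^-\bX)=rank(\bX)$ by the same mechanism.

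I expect no real obstacle here: the statement is a formal consequence of $(C1)$ and the product-rank lemma, and the ``hard part'' amounts only to bookkeeping the correct parenthesization of $\bX\bX^-\bX$ for each inequality. As an alternative for the equalities, one could instead invoke Lemma~\ref{lemma:idempotent-of-ginverse} (p.~\pageref{lemma:idempotent-of-ginverse}), which makes $\bX\bX^-$ and $\bX^-\bX$ idempotent projection matrices, but the sandwiching route above is cleaner and self-contained, so I would present that as the main proof.
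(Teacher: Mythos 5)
Your proof is correct and follows essentially the same route as the paper's: both derive everything from $(C1)$ together with the product-rank bound of Lemma~\ref{lemma:rankAB}, using the appropriate parenthesizations of $\bX\bX^-\bX$ and sandwiching. The only cosmetic difference is that you group the triple product as $\bX\cdot(\bX^-\bX)$ for the first inequality where the paper uses $(\bX\bX^-)\cdot\bX$; both are equally valid.
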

\begin{proof}[of Lemma~\ref{proposition:rank-of-ginverse}]
From condition $(C1)$, we know that $\rank(\bX) = \rank( \bX\bX^-\bX)$. Moreover,
$$
\rank( \bX\bX^-\bX) \leq \rank(\bX\bX^-)\leq \rank(\bX^-),
$$
where the first inequality follows because the columns of $\bX\bX^-\bX$ are linear combinations of the columns of $\bX\bX^-$, and the second inequality follows because the rows of $\bX\bX^-$ are  linear combinations of the rows of $\bX^-$.

For the second part, we have 
$$
\rank(\bX) \geq \rank(\bX\bX^-) \geq \rank(\bX\bX^-\bX),
$$
where the first inequality follows because the columns of $\bX\bX^-$ are linear combinations of the columns of $\bX$, and the second inequality follows because the columns of $\bX\bX^-\bX$ are linear combinations of the columns of $\bX\bX^-$. 
From $(C1)$ again, $\rank(\bX) = \rank( \bX\bX^-\bX)$, which implies by ``sandwiching" that
$$
\rank(\bX) = \rank(\bX\bX^-) = \rank(\bX\bX^-\bX).
$$ 
Similarly, we also have 
$$
\rank(\bX) \geq \rank(\bX^-\bX) \geq \rank(\bX\bX^-\bX),
$$
where the first inequality follows because the rows of $\bX^-\bX$ are linear combinations of the rows of $\bX$, and the second inequality follows because the rows of $\bX\bX^-\bX$ are linear combinations of the rows of $\bX^-\bX$. By ``sandwiching" again, we have 
$$
\rank(\bX) = \rank(\bX^-\bX) = \rank(\bX\bX^-\bX),
$$ 
which completes the proof.
\end{proof}

In Lemma~\ref{theorem:one-sided-invertible}, we demonstrated  that a left inverse exists if and only if $\bX$ has full column rank, and a right inverse exists if and only if $\bX$ has full row rank. 
However, these full-rank conditions are not required for the existence of a generalized inverse. When such full-rank conditions are satisfied, the following property holds:
\begin{lemma}[Full-rank generalized inverse]\label{Lemma:fullrank-ginverse}
Given any matrix $\bX \in \real^{n\times p}$ and its generalized inverse $\bX^- \in\real^{p\times n}$, the following statements hold:
\begin{enumerate}[(i)]
\item  $\bX$ has full column rank if and only if $\bX^-\bX = \bI_p$;
\item  $\bX$ has full row rank if and only if $\bX\bX^- = \bI_n$.
\end{enumerate}
\end{lemma}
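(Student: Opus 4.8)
The plan is to reduce each of the two biconditionals to a single elementary observation: a square idempotent matrix whose rank equals its dimension must be the identity. Indeed, if $\bP\in\real^{k\times k}$ satisfies $\bP^2=\bP$ and $rank(\bP)=k$, then $\bP$ is invertible, and multiplying $\bP^2=\bP$ on the left by $\bP^{-1}$ gives $\bP=\bI_k$. I would state and verify this small fact first, since it is the engine behind both parts.

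For the forward direction of part 1, suppose $\bX$ has full column rank, so $rank(\bX)=p$ (forcing $n\geq p$). By Lemma~\ref{lemma:idempotent-of-ginverse} the matrix $\bX^-\bX\in\real^{p\times p}$ is idempotent, and by Lemma~\ref{proposition:rank-of-ginverse} its rank equals $rank(\bX)=p$. Thus $\bX^-\bX$ is a $p\times p$ idempotent matrix of full rank $p$, and the elementary fact above yields $\bX^-\bX=\bI_p$. The forward direction of part 2 is entirely symmetric: full row rank gives $rank(\bX)=n$, and $\bX\bX^-\in\real^{n\times n}$ is idempotent of rank $n$ (again by Lemmas~\ref{lemma:idempotent-of-ginverse} and~\ref{proposition:rank-of-ginverse}), hence equals $\bI_n$.

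For the reverse directions, I would appeal directly to Lemma~\ref{theorem:one-sided-invertible}. If $\bX^-\bX=\bI_p$, then $\bX^-$ is a left inverse of $\bX$, so $\bX$ is left-invertible and therefore has full column rank; likewise, if $\bX\bX^-=\bI_n$, then $\bX^-$ is a right inverse of $\bX$, so $\bX$ is right-invertible and has full row rank. Alternatively, the reverse directions follow from rank bounds alone: $\bX^-\bX=\bI_p$ forces $p=rank(\bI_p)=rank(\bX^-\bX)\leq rank(\bX)\leq p$, so that $rank(\bX)=p$.

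There is no genuine obstacle here; the content is essentially bookkeeping built on results already proved in this section. The only point deserving care is making explicit that $\bX^-\bX$ and $\bX\bX^-$ are the relevant idempotents with the correct dimensions ($p\times p$ and $n\times n$ respectively), so that the rank counts force an identity of the right size. Everything else is an immediate consequence of the idempotency and rank identities for the generalized inverse.
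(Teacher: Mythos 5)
Your proposal is correct and follows essentially the same route as the paper: the paper's forward direction likewise uses $rank(\bX^-\bX)=rank(\bX)=p$ to get nonsingularity and then multiplies the idempotency relation $(\bX^-\bX)^2=\bX^-\bX$ (coming from $(C1)$) by $(\bX^-\bX)^{-1}$, which is exactly your "full-rank square idempotent is the identity" fact; its reverse direction is your rank-bound alternative, $p=rank(\bI_p)=rank(\bX^-\bX)=rank(\bX)$. No gaps.
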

\begin{proof}[of Lemma~\ref{Lemma:fullrank-ginverse}]
For (i), suppose $\bX$ has full column rank, and we have shown in Lemma~\ref{proposition:rank-of-ginverse} that $\rank(\bX)=\rank(\bX\bX^-)=\rank(\bX^-\bX)$. Thus, $\rank(\bX^-\bX)=\rank(\bX)=p$, and $\bX^-\bX\in \real^{p\times p}$ is nonsingular. We have 
$$
\bI_p= (\bX^-\bX)(\bX^-\bX)^{-1} = \bX^-(\bX\bX^-\bX)(\bX^-\bX)^{-1} = \bX^-\bX.
$$
Conversely, suppose $\bX^-\bX = \bI_p$, which implies $\rank(\bX^-\bX)=p$. From $\rank(\bX^-\bX)=\rank(\bX)$, we have $\rank(\bX)=p$ such that $\bX$ has full column rank.

Similarly, we can show $\bX$ has full row rank if and only if $\bX\bX^- = \bI_n$.
\end{proof}

\begin{lemma}[Constructing generalized inverse]\label{Lemma:construct-ginverse}
Let $\bX \in \real^{n\times p}$ be any matrix with a  generalized inverse $\bX^- \in\real^{p\times n}$. 
Then there exists a $p\times n$ matrix $\bA$ such that the matrix 
\begin{equation}\label{equation:constructing-ginverse}
\overline{\bX} = \bX^-+ \bA-\bX^-\bX\bA\bX\bX^-
\end{equation}
is also a generalized inverse of $\bX$. 
Moreover, for any generalized inverse $\overline{\bX}$ of $\bX$, there exists some matrix $\bA$ so that Equation~\eqref{equation:constructing-ginverse} is satisfied.
\end{lemma}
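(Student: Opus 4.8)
The plan is to establish both halves of the statement by direct algebraic verification, leaning entirely on the single defining identity of a generalized inverse recorded in condition $(C1)$, namely $\bX\bX^-\bX = \bX$ (and, for the converse, the analogous $\bX\bar{\bX}\bX = \bX$). No structural theory of the four subspaces is needed here; everything reduces to expanding products and collapsing them using $(C1)$. So the real content is choosing the right substitution in the second part, not the manipulations themselves.

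For the forward direction I would take $\bA \in \real^{p\times n}$ to be completely arbitrary and compute $\bX\bar{\bX}\bX$ with $\bar{\bX} = \bX^- + \bA - \bX^-\bX\bA\bX\bX^-$. Distributing $\bX$ on the left and $\bX$ on the right gives three terms: $\bX\bX^-\bX$, $\bX\bA\bX$, and $\bX\bX^-\bX\bA\bX\bX^-\bX$. The first term equals $\bX$ by $(C1)$, and in the third term the two bracketed blocks $\bX\bX^-\bX$ each collapse to $\bX$, so that term equals $\bX\bA\bX$ and cancels the middle term exactly. Hence $\bX\bar{\bX}\bX = \bX$ for every choice of $\bA$, which is precisely what it means for $\bar{\bX}$ to be a generalized inverse.

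For the converse I would make the natural guess $\bA = \bar{\bX} - \bX^-$, where $\bar{\bX}$ is now a given arbitrary generalized inverse, and substitute it into the right-hand side of the displayed formula. After the cancellation $\bX^- + (\bar{\bX} - \bX^-) = \bar{\bX}$, what remains to be shown is that the correction term $\bX^-\bX(\bar{\bX} - \bX^-)\bX\bX^-$ vanishes. Expanding it into $\bX^-\bX\bar{\bX}\bX\bX^-$ minus $\bX^-\bX\bX^-\bX\bX^-$ and applying $\bX\bar{\bX}\bX = \bX$ to the first piece and $\bX\bX^-\bX = \bX$ to the second, both pieces reduce to the same matrix $\bX^-\bX\bX^-$, so the correction term is zero and the formula returns exactly $\bar{\bX}$. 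The only mildly non-obvious step is recognizing that this particular $\bA$ does the job; once it is written down, the verification is mechanical, and that guess is the single point where any real thought is required.
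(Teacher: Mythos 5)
Your proposal is correct and follows essentially the same route as the paper's proof: direct expansion of $\bX\bar{\bX}\bX$ using $(C1)$ for the forward direction, and the substitution $\bA = \bar{\bX} - \bX^-$ followed by collapsing the correction term via $\bX\bar{\bX}\bX = \bX$ and $\bX\bX^-\bX = \bX$ for the converse. No gaps.
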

\begin{proof}[of Lemma~\ref{Lemma:construct-ginverse}]
Write out the equation
$$
\begin{aligned}
\bX\overline{\bX}\bX &= \bX(\bX^-+ \bA-\bX^-\bX\bA\bX\bX^-)\bX = \bX\bX^-\bX + \bX\bA\bX  -(\bX\bX^-\bX)\bA(\bX\bX^-\bX)\\
&= \bX\bX^-\bX + \bX\bA\bX  -\bX\bA\bX = \bX.
\end{aligned}
$$
Therefore, $\overline{\bX}$ satisfies condition $(C1)$ and is indeed a generalized inverse of $\bX$. 

Now suppose that $\bB$ is any generalized inverse of $\bX$, and define $\bA \triangleq \bB-\bX^-$. Recall that $\bX\bB\bX = \bX$, we have 
$$
\begin{aligned}
\bX^-+ \bA-\bX^-\bX\bA\bX\bX^- &= \bX^-+ (\bB-\bX^-)-\bX^-\bX(\bB-\bX^-)\bX\bX^- \\
&= \bB - \bX^-(\bX\bB\bX)\bX^- + \bX^-(\bX\bX^-\bX)\bX^-\\
&= \bB - \bX^-\bX\bX^- + \bX^-\bX\bX^- 
= \bB,
\end{aligned}
$$
which implies that the matrix $\bA$ can be constructed for any generalized inverse $\bB$ of $\bX$.
\end{proof}

We end up this section by providing more properties of the generalized inverse.
\begin{lemma}[Generalized inverse properties]\label{proposition:ginverse-properties}
Let $\bX \in \real^{n\times p}$ be any matrix  and let  $\bX^- \in\real^{p\times n}$ be a generalized inverse of $\bX$. 
Then the following properties hold:
\begin{enumerate}[(i)]
\item  $(\bX^\top)^- = (\bX^-)^\top$, i.e., $(\bX^-)^\top$ is a generalized inverse of $\bX^\top $;
\item  For any scalar $a\neq0$, $\frac{1}{a} \bX^-$ is a generalized inverse of $a\bX$;
\item  If $\bA\in \real^{n\times n}$ and $\bB\in \real^{p\times p}$ are both nonsingular, then $\bB^{-1}\bX^- \bA^{-1}$ is a generalized inverse of $\bA\bX\bB$;
\item  $\cspace(\bX\bX^-) = \cspace(\bX)$ and $\nspace(\bX^-\bX) = \nspace(\bX)$.
\end{enumerate}
\end{lemma}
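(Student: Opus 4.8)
The plan is to treat all four claims as immediate consequences of the single defining identity $(C1)$, namely $\bX\bX^-\bX=\bX$. The unifying observation is that a matrix $\bG$ is a generalized inverse of a matrix $\bM$ precisely when $\bM\bG\bM=\bM$, so for the first three parts it suffices to substitute the proposed candidate into this identity and simplify using $(C1)$; nothing beyond the defining property is required.

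For part 1, I would simply transpose $(C1)$. Since $\bX\bX^-\bX=\bX$, taking transposes and using the reversal rule for transposes of products gives $\bX^\top(\bX^-)^\top\bX^\top=\bX^\top$, which is exactly the defining property of a generalized inverse applied to $\bX^\top$; hence $(\bX^-)^\top$ is a generalized inverse of $\bX^\top$. For part 2, I would compute $(a\bX)\big(\tfrac{1}{a}\bX^-\big)(a\bX)$, factor the scalar $a\cdot\tfrac{1}{a}\cdot a=a$ out front, and apply $(C1)$ to the remaining product $\bX\bX^-\bX=\bX$, obtaining $a\bX$. For part 3, I would expand $(\bA\bX\bB)(\bB^{-1}\bX^-\bA^{-1})(\bA\bX\bB)$, cancel the inner factors $\bB\bB^{-1}=\bI_p$ and $\bA^{-1}\bA=\bI_n$ (which exist because $\bA,\bB$ are assumed invertible), and again invoke $(C1)$ on the surviving $\bX\bX^-\bX$, leaving $\bA\bX\bB$.

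For part 4 the approach is to prove each subspace equality by double inclusion. For $\cspace(\bX\bX^-)\subseteq\cspace(\bX)$, I would note that every column of $\bX\bX^-$ is a linear combination of the columns of $\bX$; for the reverse inclusion, I would use $(C1)$ in the form $\bX=(\bX\bX^-)\bX$, so every column of $\bX$ is a combination of columns of $\bX\bX^-$, giving $\cspace(\bX)\subseteq\cspace(\bX\bX^-)$ (alternatively, one may combine the inclusion with the rank equality $rank(\bX)=rank(\bX\bX^-)$ already established in Lemma~\ref{proposition:rank-of-ginverse}). For the null space equality, if $\bX\bbeta=\bzero$ then $\bX^-\bX\bbeta=\bzero$, giving $\nspace(\bX)\subseteq\nspace(\bX^-\bX)$; conversely, if $\bX^-\bX\bbeta=\bzero$ then left-multiplying by $\bX$ and using $(C1)$ yields $\bX\bX^-\bX\bbeta=\bX\bbeta=\bzero$, giving the opposite inclusion.

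I do not anticipate a genuine obstacle here, since every part reduces to a one-line manipulation of $(C1)$; the only point demanding a little care is the second inclusion in each equality of part 4, where the defining identity $\bX\bX^-\bX=\bX$ must be used in the correct factored form rather than the trivial ``combination of columns'' argument, which alone only yields one direction.
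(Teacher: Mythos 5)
Your proposal is correct and follows essentially the same route as the paper: parts 1--3 are verified by substituting into $(C1)$ and simplifying, and part 4 rests on the one-sided column/row inclusion combined with the identity $\bX\bX^-\bX=\bX$. The only cosmetic difference is that in part 4 the paper closes the argument via the rank equality of Lemma~\ref{proposition:rank-of-ginverse} rather than your direct double-inclusion, but you note that alternative yourself and both are valid.
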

\begin{proof}[of Lemma~\ref{proposition:ginverse-properties}]
\textbf{(i).} From condition $(C1)$, $\bX\bX^-\bX = \bX$, we have $\bX^\top (\bX^-)^\top\bX^\top =\bX^\top $ such that $(\bX^-)^\top$ is a generalized inverse of $\bX^\top $.
\paragraph{(ii).} It can be easily verified that $(a\bX) (\frac{1}{a} \bX^-) (a\bX) = (a\bX)$ such that $\frac{1}{a} \bX^-$ is a generalized inverse of $a\bX$ for any $a\neq 0$.

\paragraph{(iii).} We realize that $(\bA\bX\bB) (\bB^{-1}\bX^- \bA^{-1})(\bA\bX\bB)= \bA\bX\bX^- \bX\bB=\bA\bX\bB$, which implies $\bB^{-1}\bX^- \bA^{-1}$ is a generalized inverse of $\bA\bX\bB$.

\paragraph{(iv).} Note that the columns of $\bX\bX^-$ are linear combinations of the columns of $\bX$, so: $\cspace(\bX\bX^-) \subseteq \cspace(\bX)$. From Lemma~\ref{proposition:rank-of-ginverse}, we know $\rank(\bX)=\rank(\bX\bX^-)$, then $\cspace(\bX\bX^-) = \cspace(\bX)$. Similarly, we could prove $\nspace(\bX^-\bX) = \nspace(\bX)$.
This completes the proof.
\end{proof}

\subsection{Reflexive Generalized Inverse (rg-inverse)}

\begin{figure}[h!]
\centering
\includegraphics[width=0.8\textwidth]{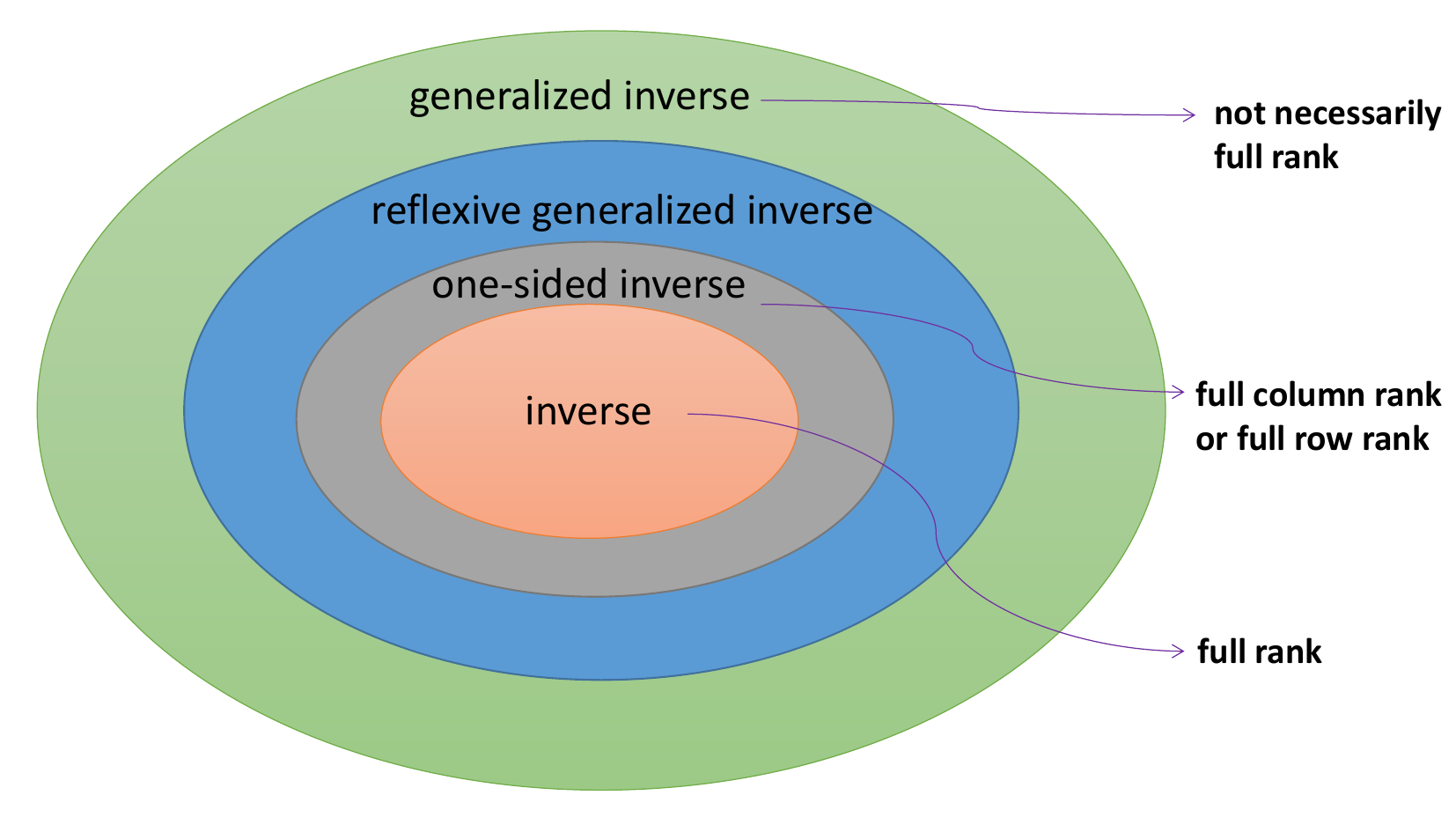}
\caption{Relationship of different inverses: inverse, one-sided inverse, reflexive generalized inverse, and generalized inverse.}
\label{fig:pseudo-inverse-comparison}
\end{figure}
\begin{definition}[Reflexive Generalized Inverse]\index{Reflexive generalized inverse}
Let $\bX \in \real^{n\times p}$ be a matrix of rank $r$, where $r \leq \min\{p, n\}$. Then, a \textit{reflexive generalized inverse (rg-inverse)} $\bX_r^- \in\real^{p\times n}$ of $\bX$ is a matrix that satisfies the conditions:
$$
(C1) \qquad  \bX\bX_r^-\bX = \bX, \qquad \text{$\bX_r^-$ is an inner inverse}
$$
and
$$
(C2) \qquad \bX_r^- \bX \bX_r^- = \bX_r^-. \qquad \text{$\bX_r^-$ is an outer inverse}
$$ 
In other words, $\bX_r^-$ is a g-inverse of $\bX$, and at the same time, $\bX$ is a g-inverse of $\bX_r^-$.
\end{definition}
Note that $(C1)$ shows that $\bX_r^-$ is a \textit{generalized inverse} (or \textit{inner inverse}, \textit{$\{1\}$-inverse}) or $\bX$; while $(C2)$ shows that  $\bX$ is a generalized inverse of $\bX_r^-$. 
Equivalently, we say that $\bX_r^-$ is an \textit{outer inverse} or a \textit{$\{2\}$-inverse} of $\bX$ by $(C2)$ \citep{nashed1973generalized, bjorck2024numerical}.

Suppose the matrix $\bX$ is of rank $r$. Then, it can be factored as $\bX = \bE_1 
\begin{bmatrixfoot}
\bI_r & \bzero\\
\bzero & \bzero
\end{bmatrixfoot}\bE_2$, where $\bE_1\in \real^{n\times n}$ and $\bE_2\in \real^{p\times p}$ are elementary transformations on $\bX$. Then, we can construct such a reflexive generalized inverse of $\bX$ as 
\begin{equation}
\bX_r^- = \bE_2^{-1} \begin{bmatrix}
\bI_r & \bA \\
\bB  & \bB\bA
\end{bmatrix}\bE_1^{-1},
\end{equation}
where $\bA\in \real^{r\times (n-r)}$, $\bB\in \real^{(p-r)\times r}$ can be any arbitrary matrices.
This shows that the reflexive generalized inverse is generally \textbf{not unique}. 
This construction of the reflexive generalized inverse also shows that a reflexive generalized inverse exists for any matrix $\bX$. 
Therefore, compared to one-sided inverses---which only exist under strict full-rank conditions---the reflexive generalized inverse is a more general concept.
\begin{lemma}[Reflexive generalized inverse from g-inverse]\label{lemma:generalized-for-reflexive-two}
Let $\bX \in \real^{n\times p}$ be any matrix, and suppose $\bA$ and $\bB$ are both generalized inverses of $\bX$. 
Define the matrix:
$$
\bZ \triangleq \bA\bX\bB.
$$
Then, $\bZ$ is a reflexive generalized inverse of $\bX$.
\end{lemma}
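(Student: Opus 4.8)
The plan is to verify directly that $\bZ = \bA\bX\bB$ satisfies the two defining conditions $(C1)$ and $(C2)$ of a reflexive generalized inverse, using only the hypothesis that $\bA$ and $\bB$ are generalized inverses of $\bX$, i.e., that $\bX\bA\bX = \bX$ and $\bX\bB\bX = \bX$. Both conditions reduce to purely algebraic manipulations exploiting the associativity of matrix multiplication together with these two identities, so there is no genuine structural obstacle here.

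First I would check $(C1)$, namely $\bX\bZ\bX = \bX$. Substituting the definition of $\bZ$ gives $\bX\bZ\bX = \bX\bA\bX\bB\bX$, and by grouping the first three factors as $(\bX\bA\bX)\bB\bX = \bX\bB\bX = \bX$, where the first equality uses that $\bA$ is a generalized inverse and the second that $\bB$ is a generalized inverse. Thus $\bX\bZ\bX = \bX$, so $\bZ$ is itself a generalized inverse of $\bX$.

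Next I would check $(C2)$, namely $\bZ\bX\bZ = \bZ$. Expanding, $\bZ\bX\bZ = (\bA\bX\bB)\bX(\bA\bX\bB) = \bA\bX\bB\bX\bA\bX\bB$. Grouping $\bA(\bX\bB\bX)\bA\bX\bB = \bA\bX\bA\bX\bB$ uses $\bX\bB\bX = \bX$, and then $\bA(\bX\bA\bX)\bB = \bA\bX\bB = \bZ$ uses $\bX\bA\bX = \bX$. Hence $\bZ\bX\bZ = \bZ$, which is exactly $(C2)$ with $\bZ$ in the role of the reflexive generalized inverse.

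Since $\bZ$ satisfies both $(C1)$ and $(C2)$, it is by definition a reflexive generalized inverse of $\bX$, completing the argument. The only point requiring any care is keeping the parenthesization straight so that each application of the generalized-inverse identity $\bX(\cdot)\bX = \bX$ is applied to the correct grouping of factors; once the factors are grouped correctly, the result falls out immediately, and I do not anticipate any difficulty beyond this bookkeeping.
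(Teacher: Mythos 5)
Your proof is correct and is exactly the verification the paper has in mind: the paper simply states that $\bX\bZ\bX = \bX$ and $\bZ\bX\bZ = \bZ$ ``can be easily verified,'' and your regroupings using $\bX\bA\bX=\bX$ and $\bX\bB\bX=\bX$ supply that verification. Both computations check out, so nothing further is needed.
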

It can be easily verified that $\bX\bZ\bX = \bX$ and $\bZ\bX\bZ = \bZ$ for the  lemma above.

\begin{lemma}[Reflexive generalized inverse from g-inverse]\label{lemma:generalized-for-reflexive-two22}
For any matrix $\bX \in \real^{n\times p}$,  the following two matrices $\bA$ and $\bB$ are two reflexive generalized inverses of $\bX$:
$$
\begin{aligned}
\bA &=(\bX^\top\bX)^-\bX^\top, \\
\bB &=\bX^\top (\bX\bX^\top)^-,
\end{aligned}
$$
where $(\bX^\top\bX)^-$ is a g-inverse of $(\bX^\top\bX)$, and $(\bX\bX^\top)^-$ is a g-inverse of $(\bX\bX^\top)$.
\end{lemma}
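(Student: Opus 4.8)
The plan is to verify the two defining identities of a reflexive generalized inverse, namely $(C1)$ $\bX\bZ\bX=\bX$ and $(C2)$ $\bZ\bX\bZ=\bZ$, for $\bZ=\bA$ and $\bZ=\bB$ separately. Write $\bG = (\bX^\top\bX)^-$ and $\bS=\bX^\top\bX$, so that $\bA=\bG\bX^\top$ and $\bG$ satisfies the defining g-inverse relation $\bS\bG\bS=\bS$. The whole argument rests on one auxiliary identity that I would establish first:
$$
\bX\bG\bX^\top\bX = \bX \qquad\text{and its transpose}\qquad \bX^\top\bX\bG\bX^\top=\bX^\top,
$$
valid for every g-inverse $\bG$ of $\bS$. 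Granting this, $(C1)$ for $\bA$ is immediate, $\bX\bA\bX = \bX\bG\bX^\top\bX=\bX$, and $(C2)$ follows by regrouping, $\bA\bX\bA = \bG(\bX^\top\bX\bG\bX^\top)=\bG\bX^\top=\bA$, where the bracketed factor is rewritten using the second form of the auxiliary identity.

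First I would prove the auxiliary identity, which is the crux of the lemma. Set $\bM = \bX\bG\bX^\top\bX-\bX = \bX(\bG\bS-\bI)$ and show $\bM=\bzero$ by proving $\bM^\top\bM=\bzero$ (a matrix with $\bM^\top\bM=\bzero$ must vanish, since the $i$-th diagonal entry of $\bM^\top\bM$ is the squared norm of the $i$-th column of $\bM$). Expanding and using $\bS^\top=\bS$,
$$
\bM^\top\bM = (\bS\bG^\top-\bI)\bS(\bG\bS-\bI) = \bS\bG^\top\bS\bG\bS - \bS\bG^\top\bS - \bS\bG\bS + \bS.
$$
The essential observation is that $\bG^\top$ is also a g-inverse of $\bS$: transposing $\bS\bG\bS=\bS$ gives $\bS\bG^\top\bS=\bS$. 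Substituting $\bS\bG\bS=\bS$ and $\bS\bG^\top\bS=\bS$ collapses the four terms to $\bS-\bS-\bS+\bS=\bzero$, so $\bM=\bzero$ and the first identity holds. The second identity, $\bX^\top\bX\bG\bX^\top=\bX^\top$, is obtained by applying the first identity with $\bG^\top$ in place of $\bG$ (legitimate since $\bG^\top$ is a g-inverse of $\bS$) to get $\bX\bG^\top\bX^\top\bX=\bX$, and then transposing.

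Having the auxiliary identity, I would record the two verifications for $\bA$ as above, and then dispose of $\bB=\bX^\top(\bX\bX^\top)^-$ by symmetry rather than repeating the computation. Applying the $\bA$-result to $\bX^\top$ in place of $\bX$ shows that $(\bX\bX^\top)^-\bX$ is a reflexive generalized inverse of $\bX^\top$; transposing a reflexive generalized inverse yields one for the transposed matrix (both $(C1)$ and $(C2)$ are preserved under transposition), so $\bX^\top\big((\bX\bX^\top)^-\big)^\top$ is a reflexive generalized inverse of $\bX$, and by part~1 of the Generalized Inverse Properties lemma $\big((\bX\bX^\top)^-\big)^\top$ is again a g-inverse of $\bX\bX^\top$, which we may relabel as $(\bX\bX^\top)^-$. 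The main obstacle is the auxiliary identity $\bX\bG\bX^\top\bX=\bX$; once the $\bM^\top\bM=\bzero$ device, together with the fact that $\bG^\top$ is also a g-inverse of $\bS$, is in hand, everything else is bookkeeping.
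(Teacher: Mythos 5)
Your proof is correct, and it reaches the same pivotal identity as the paper --- namely $\bX(\bX^\top\bX)^-\bX^\top\bX=\bX$ and its transpose --- but establishes it by a genuinely different device. The paper argues through subspaces: it first shows $\cspace(\bX^\top\bX)=\cspace(\bX^\top)$ (leaning on the rank equality $rank(\bX^\top\bX)=rank(\bX)$), writes $\bX^\top=\bX^\top\bX\bZ$ for some $\bZ$, and then cancels using the g-inverse condition; it then says ``similarly'' for $\bB$ without giving a transposition argument. You instead set $\bM=\bX\bigl((\bX^\top\bX)^-\bX^\top\bX-\bI\bigr)$ and show $\bM^\top\bM=\bzero$ by expanding and using that $\bG^\top$ is again a g-inverse of the symmetric matrix $\bS=\bX^\top\bX$; this is purely algebraic, needs nothing beyond the defining relation $\bS\bG\bS=\bS$ and the fact that a real matrix with $\bM^\top\bM=\bzero$ vanishes, and is in the same spirit as (but independent of) the paper's own proof that $\nspace(\bX^\top\bX)=\nspace(\bX)$. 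Your dispatch of $\bB$ by applying the $\bA$-result to $\bX^\top$ and transposing is also cleaner than repeating the computation, and you correctly handle the only subtlety there: the transpose map is an involution on the set of g-inverses of the symmetric matrix $\bX\bX^\top$, so relabelling $\bigl((\bX\bX^\top)^-\bigr)^\top$ as $(\bX\bX^\top)^-$ loses no generality. All steps check out; the trade-off is that your route is more self-contained, while the paper's route makes the underlying column-space geometry ($\cspace(\bX^\top\bX)=\cspace(\bX^\top)$) explicit, which it reuses elsewhere.
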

\begin{proof}[of Lemma~\ref{lemma:generalized-for-reflexive-two22}] 
By Lemma~\ref{lemma:rank-of-ttt}, we have  $\cspace(\bX^\top \bX) = \cspace(\bX^\top)$ and $\nspace(\bX^\top \bX)=\nspace(\bX)$.
Then there exists a set of vectors $\bz_1, \bz_2, \ldots, \bz_n \in \real^p$ such that the $i$-th column of $\bX^\top$ can be expressed as $\bX^\top\bX\bz_i$. 
Let $\bZ\triangleq[\bz_1, \bz_2, \ldots, \bz_n]$, so we have:
$
\bX^\top = \bX^\top\bX\bZ.
$
Using this identity and the condition $(C1)$ of g-inverse, compute:
\begin{equation}\label{equation:reflex-eq1}
\begin{aligned}
\bX\bA\bX
& = (\bX^\top\bX\bZ)^\top  (\bX^\top\bX)^-\bX^\top \bX \\
&=\bZ^\top \bX^\top\bX (\bX^\top\bX)^-\bX^\top\bX= \bZ^\top \bX^\top\bX=\bX.
\end{aligned}
\end{equation}
Now, let's examine $\bA\bX\bA$:
$$
\bA\bX\bA = (\bX^\top\bX)^-\bX^\top \bX(\bX^\top\bX)^-\bX^\top.
$$
The same argument applies to $\bX^\top \bX(\bX^\top\bX)^-\bX^\top = \bX^\top$. Thus,
\begin{equation}\label{equation:reflex-eq2}
\bA\bX\bA = (\bX^\top\bX)^-\bX^\top = \bA.
\end{equation}
Combining Equation~\eqref{equation:reflex-eq1} and Equation~\eqref{equation:reflex-eq2}, we conclude that $\bA$ is a reflexive generalized inverse of $\bX$. Similarly, we can show $\bB$ is a reflexive generalized inverse of $\bX$ as well.
\end{proof}

From the definition, it is clear that a reflexive generalized inverse is a special type of generalized inverse. However, under certain conditions, these two types of inverses are equivalent.
\begin{lemma}[Reflexive generalized inverse in g-inverse]\label{lemma:reflexive-from-ginverse}
Let $\bX \in \real^{n\times p}$ be any matrix, and let  $\bX^-\in \real^{p\times n}$ be a generalized inverse of $\bX$. 
Then, $\bX^-$ is a reflexive generalized inverse of $\bX$ if and only if $\rank(\bX)=\rank(\bX^-)$.
\end{lemma}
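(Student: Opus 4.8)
The plan is to prove the two implications separately, in both cases leaning on the rank identities $rank(\bX^-) \geq rank(\bX)$ and $rank(\bX^-\bX) = rank(\bX\bX^-) = rank(\bX)$ established in Lemma~\ref{proposition:rank-of-ginverse}, together with the fact from Lemma~\ref{lemma:idempotent-of-ginverse} that $\bX^-\bX$ is idempotent. Throughout, condition $(C1)$, $\bX\bX^-\bX = \bX$, is available by hypothesis, and what distinguishes a reflexive g-inverse is the extra condition $(C2)$, $\bX^-\bX\bX^- = \bX^-$; so the whole statement reduces to showing that, given $(C1)$, condition $(C2)$ is equivalent to $rank(\bX^-)=rank(\bX)$.

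For the forward direction I would assume $\bX^-$ is reflexive, so $(C2)$ holds. Taking ranks of both sides of $\bX^-\bX\bX^- = \bX^-$ and applying the product-rank bound of Lemma~\ref{lemma:rankAB} gives $rank(\bX^-) = rank(\bX^-\bX\bX^-) \leq rank(\bX^-\bX) = rank(\bX)$, the last equality being Lemma~\ref{proposition:rank-of-ginverse}. Combined with the reverse inequality $rank(\bX^-) \geq rank(\bX)$ (again Lemma~\ref{proposition:rank-of-ginverse}), this forces $rank(\bX^-) = rank(\bX)$.

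For the converse I would assume $rank(\bX^-) = rank(\bX)$ and deduce $(C2)$, which I would rewrite as $(\bI - \bX^-\bX)\bX^- = \bzero$. The idea is to argue entirely with column spaces. Setting $\bP = \bX^-\bX$, which is idempotent by Lemma~\ref{lemma:idempotent-of-ginverse}, one has $\nspace(\bI - \bP) = \cspace(\bP)$, so $(\bI - \bP)\bX^- = \bzero$ is equivalent to the inclusion $\cspace(\bX^-) \subseteq \cspace(\bX^-\bX)$. The opposite inclusion $\cspace(\bX^-\bX) \subseteq \cspace(\bX^-)$ is automatic since the columns of $\bX^-\bX$ are combinations of columns of $\bX^-$; and because $rank(\bX^-\bX) = rank(\bX) = rank(\bX^-)$, the two subspaces have equal dimension and therefore coincide. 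Every column of $\bX^-$ then lies in $\cspace(\bX^-\bX) = \nspace(\bI - \bX^-\bX)$, giving $(\bI - \bX^-\bX)\bX^- = \bzero$, i.e. $(C2)$; together with the assumed $(C1)$ this makes $\bX^-$ reflexive.

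The hard part will be the converse, and specifically the two small bridging facts that turn a numerical rank equality into an exact matrix identity: the identification $\nspace(\bI - \bP) = \cspace(\bP)$ for an idempotent $\bP$, and the passage from equality of dimensions to genuine equality of the column spaces $\cspace(\bX^-)$ and $\cspace(\bX^-\bX)$. I would state both of these explicitly rather than leave them implicit, since they are precisely what upgrades the dimension count to the equation $(C2)$.
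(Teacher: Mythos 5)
Your proof is correct. The forward direction coincides with the paper's: sandwich $rank(\bX^-)$ between $rank(\bX^-\bX\bX^-)\leq rank(\bX^-\bX)=rank(\bX)$ and the general bound $rank(\bX^-)\geq rank(\bX)$. In the converse, both you and the paper extract the same key consequence from the rank hypothesis, namely $rank(\bX^-\bX)=rank(\bX^-)$ and hence $\cspace(\bX^-\bX)=\cspace(\bX^-)$; the difference lies only in how this subspace equality is converted into condition $(C2)$. The paper writes each column of $\bX^-$ as $\bX^-\bX\balpha_i$, i.e., $\bX^-=\bX^-\bX\bA$, checks that $\bA$ is itself a g-inverse of $\bX$, and then invokes Lemma~\ref{lemma:generalized-for-reflexive-two} (a product $\bA\bX\bB$ of two g-inverses is a reflexive g-inverse). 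You instead exploit the idempotency of $\bP=\bX^-\bX$ to identify $\cspace(\bP)$ with $\nspace(\bI-\bP)$, so that the inclusion $\cspace(\bX^-)\subseteq\cspace(\bP)$ directly yields $(\bI-\bX^-\bX)\bX^-=\bzero$, which is $(C2)$. Your route is slightly more self-contained, since it bypasses the auxiliary product lemma, at the cost of the two bridging facts you correctly flag: the identity $\nspace(\bI-\bP)=\cspace(\bP)$ for idempotent $\bP$ (both inclusions are one-line verifications using $\bP^2=\bP$) and the passage from equal dimension plus containment to equality of subspaces. Both are standard and both should indeed be stated explicitly as you propose; with them in place the argument is complete.
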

\begin{proof}[of Lemma~\ref{lemma:reflexive-from-ginverse}]
Suppose $\bX^-$ is a g-inverse of $\bX$, , so by definition: $\bX\bX^-\bX=\bX$. Suppose further, $\bX^-$ is also a rg-inverse, then $\bX^-\bX\bX^- = \bX^-$. We have 
$$
\begin{aligned}
\rank(\bX) &= \rank(\bX\bX^-\bX)\leq \rank(\bX^-) = \rank(\bX^-\bX\bX^-)\leq \rank(\bX)\\
\end{aligned}
$$
where the two inequalities follows from Lemma~\ref{proposition:rank-of-ginverse}. This implies $\rank(\bX)=\rank(\bX^-)$. 

Conversely, suppose $\bX^-$ is a g-inverse of $\bX$, then $\bX\bX^-\bX=\bX$. And suppose further that $\rank(\bX)=\rank(\bX^-)$, we have
$$
\rank(\bX) = \rank(\bX\bX^-\bX) \leq \rank(\bX^-\bX) \leq \rank(\bX^-) = \rank(\bX),
$$
where the first inequality follows because the rows of $\bX\bX^-\bX$ are combinations of the rows of $\bX^-\bX$, and the second inequality follows because the columns of $\bX^-\bX$ are combinations of the columns of $\bX^-$. 
This implies $\rank(\bX^-\bX) = \rank(\bX^-)$ and $\cspace(\bX^-\bX)=\cspace(\bX^-)$. Then, there exists a set of vectors $\balpha_1, \balpha_2, \ldots, \balpha_n \in \real^p$ such that the column-$i$ of $\bX^-$ can be expressed as $\bX^-\bX\balpha_i$. That is, for $\bA=[\balpha_1, \balpha_2, \ldots, \balpha_n]$, we have
$$
\bX^- = \bX^-\bX\bA.
$$
We realize again that $\bX=\bX\bX^-\bX$, then
$$
\bX=\bX\bX^-\bX = \bX(\bX^-\bX\bA)\bX = \bX\bA\bX,
$$
where the last equality follows form condition $(C1.1)$; and thus, $\bA$ is a g-inverse of $\bX$. From Lemma~\ref{lemma:generalized-for-reflexive-two}, $\bX^-=\bX^-\bX\bA$ is a rg-inverse of $\bX$, which completes the proof.
\end{proof}

To summarize this section, we now provide insight into the rank relationships in reflexive generalized inverses.
\begin{proposition}[Rank of reflexive generalized inverse]
Consider a matrix $\bX \in \real^{n\times p}$ and its generalized inverse $\bX_r^- \in\real^{p\times n}$.
Utilizing the result in Lemma~\ref{lemma:reflexive-from-ginverse} and the result from the rank of g-inverses in Lemma~\ref{proposition:rank-of-ginverse}, we have
$$
\rank(\bX_r^-)=\rank(\bX)=\rank(\bX\bX_r^-)=\rank(\bX_r^-\bX).
$$
\end{proposition}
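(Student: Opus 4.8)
The plan is to assemble the claim directly from the two results cited in the statement, since $\bX_r^-$ is by definition both a generalized inverse of $\bX$ and a \emph{reflexive} one. First I would note that every reflexive generalized inverse is in particular a generalized inverse (it satisfies $(C1)$, namely $\bX\bX_r^-\bX=\bX$), so Lemma~\ref{proposition:rank-of-ginverse} applies verbatim with $\bX^-$ replaced by $\bX_r^-$. That lemma already established $rank(\bX)=rank(\bX\bX^-)=rank(\bX^-\bX)$ for an arbitrary g-inverse by a ``sandwiching'' argument on the column and row spaces, so it immediately delivers the two middle equalities
$$
rank(\bX)=rank(\bX\bX_r^-)=rank(\bX_r^-\bX)
$$
with no further work.

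The remaining equality $rank(\bX_r^-)=rank(\bX)$ is precisely the content of Lemma~\ref{lemma:reflexive-from-ginverse}, which states that a g-inverse $\bX^-$ of $\bX$ is reflexive if and only if $rank(\bX)=rank(\bX^-)$. Since $\bX_r^-$ is assumed reflexive, the forward direction of that equivalence gives $rank(\bX_r^-)=rank(\bX)$ at once. Chaining this with the middle equalities produces the full string
$$
rank(\bX_r^-)=rank(\bX)=rank(\bX\bX_r^-)=rank(\bX_r^-\bX),
$$
which is the assertion.

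There is essentially no hard step here — the proposition is a corollary that merely packages the two earlier lemmas. The only place where reflexivity genuinely enters (as opposed to a mere g-inverse) is in establishing $rank(\bX_r^-)=rank(\bX)$. If I preferred a self-contained derivation that avoids quoting Lemma~\ref{lemma:reflexive-from-ginverse} directly, I would instead apply the rank bound $rank(\bX^-)\ge rank(\bX)$ of Lemma~\ref{proposition:rank-of-ginverse} twice: once to $\bX$ with g-inverse $\bX_r^-$ (using $(C1)$), giving $rank(\bX_r^-)\ge rank(\bX)$, and once to $\bX_r^-$ with g-inverse $\bX$ (using $(C2)$, i.e. $\bX_r^-\bX\bX_r^-=\bX_r^-$, which is exactly the reflexivity condition saying $\bX$ is a g-inverse of $\bX_r^-$), giving $rank(\bX)\ge rank(\bX_r^-)$. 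These two bounds force equality. This symmetric use of the $(C1)$/$(C2)$ pair is the single conceptual point of the argument, and it is routine, so I anticipate no real obstacle in writing the full proof.
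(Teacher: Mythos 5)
Your proposal is correct and matches the paper exactly: the proposition is stated there as a direct combination of Lemma~\ref{lemma:reflexive-from-ginverse} (giving $rank(\bX_r^-)=rank(\bX)$ from reflexivity) and Lemma~\ref{proposition:rank-of-ginverse} (giving $rank(\bX)=rank(\bX\bX_r^-)=rank(\bX_r^-\bX)$), with no additional argument. Your alternative self-contained derivation via the symmetric use of $(C1)$/$(C2)$ is also sound and is essentially how the paper proves the forward direction of Lemma~\ref{lemma:reflexive-from-ginverse} itself.
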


\begin{lemma}[Reflexive generalized inverse properties]\label{Lemma:refle-ginverse-properties}
Given a matrix $\bX \in \real^{n\times p}$ and its reflexive generalized inverse $\bX_r^- \in\real^{p\times n}$, the following properties hold:
\begin{enumerate}
\item  $\cspace(\bX\bX_r^-) = \cspace(\bX)$ and $\nspace(\bX_r^-\bX) = \nspace(\bX)$.
\item  $\cspace(\bX_r^-\bX) = \cspace(\bX_r^-)$ and $\nspace(\bX\bX_r^-) = \nspace(\bX_r^-)$. 
\end{enumerate}
\end{lemma}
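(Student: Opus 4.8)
The plan is to reduce both parts of the lemma to the single already-proven fact in Lemma~\ref{proposition:ginverse-properties} (part 4), namely that for any matrix $\bM$ with generalized inverse $\bM^-$ one has $\cspace(\bM\bM^-) = \cspace(\bM)$ and $\nspace(\bM^-\bM) = \nspace(\bM)$. The entire content of the present lemma is that a reflexive generalized inverse $\bX_r^-$ enjoys a \emph{symmetric} relationship with $\bX$: condition $(C1)$, $\bX\bX_r^-\bX = \bX$, says that $\bX_r^-$ is a g-inverse of $\bX$, while condition $(C2)$, $\bX_r^-\bX\bX_r^- = \bX_r^-$, says precisely that $\bX$ is a g-inverse of $\bX_r^-$. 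Recognizing and exploiting this role symmetry is the crux of the whole argument.

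For part~1, I would observe that $\bX_r^-$ is in particular a generalized inverse of $\bX$ (it satisfies $(C1)$), so applying Lemma~\ref{proposition:ginverse-properties} part~4 verbatim with $\bX^- = \bX_r^-$ yields $\cspace(\bX\bX_r^-) = \cspace(\bX)$ and $\nspace(\bX_r^-\bX) = \nspace(\bX)$. No computation is required; this is a direct instantiation.

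For part~2, I would swap the roles of $\bX$ and $\bX_r^-$. Setting $\bM = \bX_r^-$ and taking $\bM^- = \bX$ as its generalized inverse (valid by $(C2)$), Lemma~\ref{proposition:ginverse-properties} part~4 gives $\cspace(\bM\bM^-) = \cspace(\bM)$ and $\nspace(\bM^-\bM) = \nspace(\bM)$, which read exactly as $\cspace(\bX_r^-\bX) = \cspace(\bX_r^-)$ and $\nspace(\bX\bX_r^-) = \nspace(\bX_r^-)$. Thus both identities in part~2 fall out of the same lemma under the reflexive symmetry.

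The only genuine ``step'' is therefore the verification that $(C2)$ certifies $\bX$ as a g-inverse of $\bX_r^-$, i.e.\ matching $\bX_r^-\bX\bX_r^- = \bX_r^-$ against the defining relation $\bM\bM^-\bM = \bM$ for $\bM = \bX_r^-$. I do not expect a substantive obstacle here, since reflexivity was built into the definition precisely to make this symmetry hold; the main thing to be careful about is keeping the index bookkeeping straight when interchanging the two matrices, so that the column-space identities and null-space identities are each mapped to their correct counterparts rather than transposed versions.
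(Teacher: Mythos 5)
Your proposal is correct and is essentially identical to the paper's own proof: both parts follow by applying Lemma~\ref{proposition:ginverse-properties} (part 4) twice, once with $\bX_r^-$ as a g-inverse of $\bX$ via $(C1)$, and once with the roles swapped, using $(C2)$ to certify $\bX$ as a g-inverse of $\bX_r^-$. No gap.
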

\begin{proof}[of Lemma~\ref{Lemma:refle-ginverse-properties}]
Suppose $\bX^-$ is a g-inverse of $\bX$,
we show in Lemma~\ref{proposition:ginverse-properties}  that $\cspace(\bX\bX^-) = \cspace(\bX)$ and $\nspace(\bX^-\bX) = \nspace(\bX)$. Since $\bX^-_r$ is a g-inverse of $\bX$, and $\bX$ is a g-inverse of $\bX^-_r$, we complete the proof.
\end{proof}

\index{Pseudo-inverse}
\subsection{Pseudo-Inverse}\label{section:subsec_pseudo_inv}
As previously mentioned, for a matrix $\bX\in \real^{n\times p}$, we can find its pseudo-inverse, a $p\times n$ matrix denoted by $\bX^+$.
In simple terms, when $\bX$ multiplies a vector $\bbeta$ that lies in its row space, this produces $\bX\bbeta$ in the column space (see Figure~\ref{fig:lafundamental-ls}). 
Both of these spaces have the same dimension $r$, i.e., the rank of $\bX$. 
When restricted to these subspaces, $\bX$ behaves like an invertible matrix, and $\bX^+$ acts as its inverse. Specifically:
\begin{itemize}
\item If $\bbeta$ is in the row space of $\bX$, then  $\bX^+\bX\bbeta = \bbeta$. 
\item If $\by$ is in the column space of $\bX$, then $\bX\bX^+\by = \by$  (see Figure~\ref{fig:lafundamental5-pseudo}).
\end{itemize}

The null space of $\bX^+$ coincides with  the null space of $\bX^\top$. It contains the vectors $\by$ in $\real^n$ with $\bX^\top\by = \bzero$. Those vectors $\by$ are orthogonal to every vector $\bX\bbeta$ in the column space of $\bX$. We delay the proof of this property in Theorem~\ref{theorem:pseudo-four-basis-space}.

More formally, the \textit{pseudo-inverse},  also known as the \textit{Moore-Penrose pseudo-inverse}, $\bX^+$, is defined by the unique $p\times n$ matrix satisfying the following four conditions, often referred to as the \textit{Penrose conditions} \citep{penrose1955generalized}:
\begin{equation}\label{equation:pseudi-four-equations}
\boxed{
\begin{aligned}
&(C1) \qquad  \bX\bX^+\bX &=& \bX \qquad &(\bX^+\text{ is a g-inverse of }\bX) \\
&(C2) \qquad  \bX^+\bX\bX^+ &=&\bX^+ \qquad &(\bX\text{ is a g-inverse of }\bX^+)\\
&(C3) \qquad  (\bX\bX^+)^\top &=&\bX\bX^+ \qquad & (\text{$ \bX \bX^+ $ is symmetric})\\
&(C4) \qquad  	(\bX^+\bX)^\top &=& \bX^+\bX\qquad & (\text{$ \bX \bX^+ $ is symmetric})\\
\end{aligned}
}
\end{equation}
Although we  mostly work with real matrices, when $\bX\in\sF^{n\times p}$ (where $\sF$ denotes either $\real$ for real or $\complex$ for complex numbers), the pseudo-inverse satisfies similar conditions using the conjugate transpose:
\begin{equation}\label{equation:pseudi-four-equations_comp}
\boxed{
\begin{aligned}
	&(C1) \qquad  \bX\bX^+\bX &=& \bX \qquad &(\bX^+\text{ is a g-inverse of }\bX) \\
	&(C2) \qquad  \bX^+\bX\bX^+ &=&\bX^+ \qquad &(\bX\text{ is a g-inverse of }\bX^+)\\
	&(C3) \qquad  (\bX\bX^+)^* &=&\bX\bX^+ \qquad & (\text{$ \bX \bX^+ $ is Hermitian})\\
	&(C4) \qquad  	(\bX^+\bX)^* &=& \bX^+\bX \qquad &  (\text{$ \bX^+ \bX $ is Hermitian})
\end{aligned}
}
\end{equation}

In Lemma~\ref{lemma:idempotent-of-ginverse}, we claimed that $\bX\bX^+$ and $\bX^+\bX$ are idempotent if $\bX^+$ is a g-inverse of $\bX$, and thus they are both projection matrices (Definition~\ref{definition:projection-matrix}, i.e., an idempotent matrix). Since $\bX^+$ is the pseudo-inverse of $\bX$~\footnote{We speak of ``the pseudo-inverse" rather than ``a pseudo-inverse" since the pseudo-inverse is unique as we will prove shortly.}, by conditions $(C3)$ and $(C4)$, these projections are symmetric such that they not just general projections, but  orthogonal projections (Lemma~\ref{lemma:symmetric-projection-matrix}, symmetric idempotent matrices are called orthogonal projectors).

The existence of the pseudo-inverse for any matrix is supported by the CR or rank decomposition of the matrix.
\begin{lemma}[Existence of pseudo-inverse]\label{lemma:existence-of-pseudo-inverse}
Every matrix $\bX$ has a pseudo-inverse.
\end{lemma}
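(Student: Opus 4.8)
The plan is to construct an explicit matrix satisfying the four Penrose conditions $(C1)$--$(C4)$ of Equation~\eqref{equation:pseudi-four-equations} and thereby exhibit a pseudo-inverse for an arbitrary $\bX$. The key device is the CR decomposition (Theorem~\ref{theorem:cr-decomposition}, p.~\pageref{theorem:cr-decomposition}): writing $\bX = \bC\bR$ with $\bC\in\real^{n\times r}$ and $\bR\in\real^{r\times p}$, where $r=rank(\bX)$ and both $\bC$ and $\bR$ have full rank $r$, converts the problem for a possibly rank-deficient, non-square $\bX$ into one governed entirely by full-rank factors, for which one-sided inverses are available.

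First I would record that, since $\bC$ has full column rank, $\bC^\top\bC\in\real^{r\times r}$ is invertible (Lemma~\ref{lemma:rank-of-ata-x}, p.~\pageref{lemma:rank-of-ata-x}), so that $\bC^+ := (\bC^\top\bC)^{-1}\bC^\top$ is a left inverse with $\bC^+\bC=\bI_r$; dually, since $\bR$ has full row rank, $\bR\bR^\top$ is invertible and $\bR^+ := \bR^\top(\bR\bR^\top)^{-1}$ is a right inverse with $\bR\bR^+=\bI_r$. I then define
\[
\bX^+ := \bR^+\bC^+ = \bR^\top(\bR\bR^\top)^{-1}(\bC^\top\bC)^{-1}\bC^\top,
\]
and verify the four conditions by telescoping cancellations: $\bX\bX^+\bX = \bC(\bR\bR^+)(\bC^+\bC)\bR = \bC\bR = \bX$ gives $(C1)$, and $\bX^+\bX\bX^+ = \bR^+(\bC^+\bC)(\bR\bR^+)\bC^+ = \bR^+\bC^+ = \bX^+$ gives $(C2)$. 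For the symmetry conditions, $\bX\bX^+ = \bC(\bR\bR^+)\bC^+ = \bC\bC^+ = \bC(\bC^\top\bC)^{-1}\bC^\top$ is manifestly symmetric, yielding $(C3)$, and $\bX^+\bX = \bR^+(\bC^+\bC)\bR = \bR^+\bR = \bR^\top(\bR\bR^\top)^{-1}\bR$ is manifestly symmetric, yielding $(C4)$.

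The computation is entirely routine once the full-rank factorization is in hand, so the only real point requiring care is the bookkeeping of which products collapse to $\bI_r$; the symmetry of $\bC\bC^+$ and $\bR^+\bR$ (they are the orthogonal projectors onto $\cspace(\bC)=\cspace(\bX)$ and $\cspace(\bR^\top)=\cspace(\bX^\top)$, cf.\ Proposition~\ref{proposition:projection-from-matrix}) is precisely what distinguishes the pseudo-inverse from a mere reflexive generalized inverse. As an alternative route I could instead invoke the SVD $\bX=\bU\bSigma\bV^\top$ (Theorem~\ref{theorem:full_svd_rectangular}) and take $\bX^+ = \bV\bSigma^+\bU^\top$ as already introduced in Theorem~\ref{theorem:svd-deficient-rank}; the four conditions then follow from $\bSigma\bSigma^+\bSigma=\bSigma$, $\bSigma^+\bSigma\bSigma^+=\bSigma^+$, and the fact that $\bSigma\bSigma^+$ and $\bSigma^+\bSigma$ are diagonal (hence symmetric). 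Either construction proves existence; I would leave uniqueness of the matrix satisfying $(C1)$--$(C4)$ to a separate argument.
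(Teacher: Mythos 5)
Your construction is exactly the paper's: it factors $\bX=\bC\bR$ via the CR decomposition, sets $\bX^+=\bR^\top(\bR\bR^\top)^{-1}(\bC^\top\bC)^{-1}\bC^\top$, and checks the four Penrose conditions, with your telescoping cancellations being a slightly cleaner presentation of the same computation. The proof is correct and essentially identical in approach.
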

\begin{proof}[of Lemma~\ref{lemma:existence-of-pseudo-inverse}] Given the CR decomposition (Theorem~\ref{theorem:cr-decomposition}) or the rank decomposition (Theorem~\ref{theorem:rank-decomposition}) of $\bX=\bC\bR\in\real^{n\times p}$, let 
$$
\bX^+ = \bR^+\bC^+ = \bR^\top (\bR\bR^\top)^{-1} (\bC^\top\bC)^{-1}\bC^\top,
$$
where $\bR^+=\bR^\top (\bR\bR^\top)^{-1}$ and $\bC^+ =(\bC^\top\bC)^{-1}\bC^\top$.~\footnote{It can be easily verified that $\bR^+$ is the pseudo-inverse of $\bR$ and $\bC^+$ is the pseudo-inverse of $\bC$.} Notably, $\bR\bR^\top$ and $\bC^\top\bC$ are invertible since $\bC\in \real^{n\times r}$ and $\bR\in \real^{r\times p}$ have full rank $r$ due to the properties of the CR decomposition.

Now, we verify the Penrose conditions:
$$
\begin{aligned}
&(C1) \qquad \bX\bX^+\bX &=& \bC\bR\left(\bR^\top (\bR\bR^\top)^{-1} (\bC^\top\bC)^{-1}\bC^\top\right)\bC\bR = \bC\bR = \bX, \\
&(C2) \qquad \bX^+\bX\bX^+ &=&\left(\bR^\top (\bR\bR^\top)^{-1} (\bC^\top\bC)^{-1}\bC^\top\right) \bC\bR\left(\bR^\top (\bR\bR^\top)^{-1} (\bC^\top\bC)^{-1}\bC^\top\right)\\
&&=&\bR^\top(\bR\bR^\top)^{-1}(\bC^\top\bC)^{-1}\bC^\top = \bX^+,\\
&(C3) \qquad (\bX\bX^+)^\top &=& \bC(\bC^\top\bC)^{-1} (\bR\bR^\top)^{-1} \bR\bR^\top\bC^\top = \bC(\bC^\top\bC)^{-1} \bC^\top\\
&&=&\bC\bR\bR^\top (\bR\bR^\top)^{-1} (\bC^\top\bC)^{-1}\bC^\top = \bX\bX^+,\\
&(C4) \qquad (\bX^+\bX)^\top &=& \bR^\top\bC^\top \bC(\bC^\top\bC)^{-1} (\bR\bR^\top)^{-1} \bR= \bR^\top (\bR\bR^\top)^{-1} \bR\\
&&=& \bR^\top (\bR\bR^\top)^{-1} (\bC^\top\bC)^{-1}\bC^\top\bC\bR = \bX^+\bX.
\end{aligned}
$$
Since all four Penrose conditions are satisfied, $\bX^+$ is indeed the pseudo-inverse of $\bX$, proving that the pseudo-inverse exists for any $\bX$.
\end{proof}

\index{Uniqueness}
\begin{lemma}[Uniqueness of pseudo-inverse]\label{lemma:uniqueness-of-pseudo-inverse}
Every matrix $\bX$ has a unique pseudo-inverse.
\end{lemma}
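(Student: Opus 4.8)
The plan is to show that any two matrices satisfying the four Penrose conditions in Equation~\eqref{equation:pseudi-four-equations} must coincide. I would begin by supposing that both $\bB$ and $\bC$ are pseudo-inverses of $\bX$, so that each of them satisfies (C1)--(C4) in place of $\bX^+$. The goal is then to prove $\bB = \bC$ purely by algebraic manipulation of these four identities; no rank or dimension counting is needed, and crucially the symmetry conditions (C3) and (C4) must be invoked, since without them the inverse is only a reflexive generalized inverse and need not be unique (Lemma~\ref{lemma:reflexive-from-ginverse}).

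First I would establish the single identity $\bB = \bB\bX\bC$. Starting from $\bB = \bB\bX\bB$ (condition (C2) for $\bB$), I would replace the symmetric factor $\bX\bB$ by its transpose $\bB^\top\bX^\top$ (condition (C3) for $\bB$), obtaining $\bB = \bB\bB^\top\bX^\top$. Then I would substitute $\bX^\top = \bX^\top\bC^\top\bX^\top$, gotten by transposing $\bX\bC\bX = \bX$ (condition (C1) for $\bC$). Collapsing the prefix $\bB\bB^\top\bX^\top$ back to $\bB$ and recognizing $\bC^\top\bX^\top = (\bX\bC)^\top = \bX\bC$ via (C3) for $\bC$ yields $\bB = \bB\bX\bC$.

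Next I would run the mirror-image argument on the other side to get $\bC = \bB\bX\bC$: beginning from $\bC = \bC\bX\bC$ (condition (C2) for $\bC$), using the symmetry of $\bC\bX$ to write $\bC = \bX^\top\bC^\top\bC$ (condition (C4) for $\bC$), substituting $\bX^\top = \bX^\top\bB^\top\bX^\top$ from (C1) for $\bB$, collapsing $\bX^\top\bC^\top\bC$ to $\bC$, and finally rewriting $\bX^\top\bB^\top = (\bB\bX)^\top = \bB\bX$ by (C4) for $\bB$. Comparing the two results gives $\bB = \bB\bX\bC = \bC$, which is precisely the desired uniqueness.

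The computation is elementary, so the main obstacle is bookkeeping rather than any conceptual difficulty: one must keep careful track of which of the four conditions is applied to which of $\bB$ and $\bC$ at each substitution, and be sure to use a symmetry condition (C3 or C4) exactly at the point where a product is replaced by its transpose. The one conceptual point worth flagging is that all four conditions are genuinely needed --- (C1) and (C2) alone characterize a reflexive generalized inverse, which is \emph{not} unique, so the symmetry constraints are exactly what force the collapse $\bB = \bC$.
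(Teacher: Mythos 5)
Your proposal is correct and is essentially the paper's own argument: both proofs are pure algebraic manipulation of the four Penrose conditions, and both pass through the same intermediate quantity $\bB\bX\bC$ (the paper writes it as one long chain $\bX_1^+ = \cdots = \bX_1^+\bX\bX_2^+ = \cdots = \bX_2^+$, while you split it into the two halves $\bB = \bB\bX\bC$ and $\bC = \bB\bX\bC$). The bookkeeping of which condition applies to which inverse checks out at every step, and your remark that (C3)/(C4) are what force uniqueness beyond the reflexive case is a correct and worthwhile observation.
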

\begin{proof}[of Lemma~\ref{lemma:uniqueness-of-pseudo-inverse}]
Suppose $\bX_1^+$ and $\bX_2^+$ are two pseudo-inverses of $\bX$. Then
$$
\begin{aligned}
\bX_1^+ &= \bX_1^+\bX\bX_1^+ = \bX_1^+(\bX\bX_2^+\bX)\bX_1^+ = \bX_1^+ (\bX\bX_2^+)(\bX\bX_1^+) \qquad &(\text{by $(C2), (C1)$})\\
&=\bX_1^+ (\bX\bX_2^+)^\top(\bX\bX_1^+)^\top =  \bX_1^+\bX_2^{+\top}\bX^\top \bX_1^{+\top}\bX^\top \qquad &(\text{by $(C3)$})\\
&=\bX_1^+\bX_2^{+\top}(\bX \bX_1^{+}\bX)^\top= \bX_1^+\bX_2^{+\top}\bX^\top \qquad &(\text{by $(C1)$})\\
&= \bX_1^+(\bX \bX_2^{+})^\top =\bX_1^+ \bX \bX_2^{+} = \bX_1^+ (\bX \bX_2^+\bX) \bX_2^{+}  \qquad &(\text{by $(C3),(C1)$})\\
&= (\bX_1^+ \bX) (\bX_2^+\bX) \bX_2^{+} =  (\bX_1^+ \bX)^\top (\bX_2^+\bX)^\top \bX_2^{+}  \qquad &(\text{by $(C4)$})\\
&=(\bX \bX_1^{+} \bX)^\top \bX_2^{+\top} \bX_2^{+} = \bX^\top \bX_2^{+\top} \bX_2^{+}   \qquad &(\text{by $(C1)$})\\
&=  (\bX_2^{+}\bX)^\top \bX_2^{+} =\bX_2^{+}\bX \bX_2^{+} = \bX_2^{+}. \qquad &(\text{by $(C4),(C2)$})\\
\end{aligned}
$$
This shows that any two pseudo-inverses of $\bX$ must be equal. Therefore, the pseudo-inverse is unique.
\end{proof}

\begin{lemma}[Property of pseudo-inverse]\label{lemma:xtxxpllus_pseudo}
For any matrix $\bX\in\real^{n\times p}$, it follows that~\footnote{When $\bX\in\sF^{n\times p}$, this becomes $ \bX^* \bX \bX^+ = \bX^* $.} 
$$ \bX^\top \bX \bX^+ = \bX^\top .$$
\end{lemma}
\begin{proof}[of Lemma~\ref{lemma:xtxxpllus_pseudo}]
From the first Penrose condition of the pseudo-inverse (see \eqref{equation:pseudi-four-equations}), we know:
$
\bX \bX^+ \bX = \bX.
$
Take the  transpose of both sides:
$$
(\bX \bX^+ \bX)^\top = \bX^\top
\qquad \implies\qquad 
\bX^\top (\bX \bX^+)^\top = \bX^\top.
$$
From the third Penrose condition of the pseudo-inverse, $ (\bX \bX^+)^\top = \bX \bX^+ $. Substituting this into the equation above, we get
$
\bX^\top \bX \bX^+ = \bX^\top.
$
This completes the proof.
\end{proof}
This equality reflects the fact that $ \bX^+ $ ``projects'' vectors in the row space of $ \bX $ back onto the column space of $ \bX $ (see below for more details). Specifically:
\begin{itemize}
\item $ \bX^+ $ maps any vector $ \by $ to the least squares solution of the system $ \bX\bbeta = \by $ (we will explore this further in the book).
\item When $ \bX^+ $ is applied to $ \bX $, it effectively reconstructs $ \bX $ in a way that respects its column space and null space.
\end{itemize}
The equality $ \bX^\top \bX \bX^+ = \bX^\top $ encodes the idea that applying $ \bX^+ $ to $ \bX $ does not distort the action of $ \bX^\top $ on vectors.

We are now ready to present the four fundamental subspaces associated with the pseudo-inverse.
\begin{theoremHigh}[Four subspaces in pseudo-inverse]\label{theorem:pseudo-four-basis-space} 
Given the pseudo-inverse $\bX^+$ of $\bX$, the following properties hold:
\begin{itemize}
\item The column space of $\bX^+$ is the same as the row space of $\bX$;

\item The row space of $\bX^+$ is the same as the column space of $\bX$; 

\item The null space of $\bX^+$ is the same as the null space of $\bX^\top$; 

\item The null space of $\bX^{+\top}$ is the same as the null space of $\bX$.
\end{itemize}
The relationships among these four subspaces are illustrated in Figure~\ref{fig:lafundamental5-pseudo}.
\end{theoremHigh}
\begin{proof}[of Theorem~\ref{theorem:pseudo-four-basis-space}]
Since $\bX^+$ is a special rg-inverse, by Lemma~\ref{Lemma:refle-ginverse-properties}, we have 
$$
\begin{aligned}
\cspace(\bX\bX^+) &= \cspace(\bX) \qquad \text{and} \qquad \nspace(\bX^+\bX) = \nspace(\bX)\\
\cspace(\bX^+\bX) &= \cspace(\bX^+)\qquad \text{and} \qquad \nspace(\bX\bX^+) = \nspace(\bX^+).
\end{aligned}
$$
Additionally, from conditions $(C3)$ and $(C4)$ in the definition of pseudo-inverses, we know that:
$$
(\bX^+\bX)^\top = \bX^+\bX \qquad \text{and} \qquad (\bX\bX^+)^\top =\bX\bX^+.
$$
Using  the fundamental theorem of linear algebra (Theorem~\ref{theorem:fundamental-linear-algebra}), we realize that $\cspace(\bX\bX^+)$ is the orthogonal complement to $\nspace((\bX\bX^+)^\top)$, and $\cspace(\bX^+\bX)$ is the orthogonal complement to $\nspace((\bX^+\bX)^\top)$: 
$$
\begin{aligned}
\cspace(\bX\bX^+) \perp \nspace((\bX\bX^+)^\top) &\leadto \cspace(\bX\bX^+) \perp \nspace(\bX\bX^+) \\
\cspace(\bX^+\bX) \perp \nspace((\bX^+\bX)^\top) &\leadto \cspace(\bX^+\bX) \perp \nspace(\bX^+\bX).
\end{aligned}
$$
This implies 
$$
\begin{aligned}
\cspace(\bX) \perp \nspace(\bX^+) \qquad \text{and} \qquad \cspace(\bX^+)\perp \nspace(\bX).
\end{aligned}
$$
That is, $\nspace(\bX^+) = \nspace(\bX^\top)$ and $\cspace(\bX^+) = \cspace(\bX^\top)$. 
By the fundamental theorem of linear algebra, this also implies: $\cspace(\bX^{+\top})=\cspace(\bX)$ and $\nspace(\bX^{+\top})=\nspace(\bX)$.
\end{proof}

\begin{figure}[h!]
\centering
\includegraphics[width=0.98\textwidth]{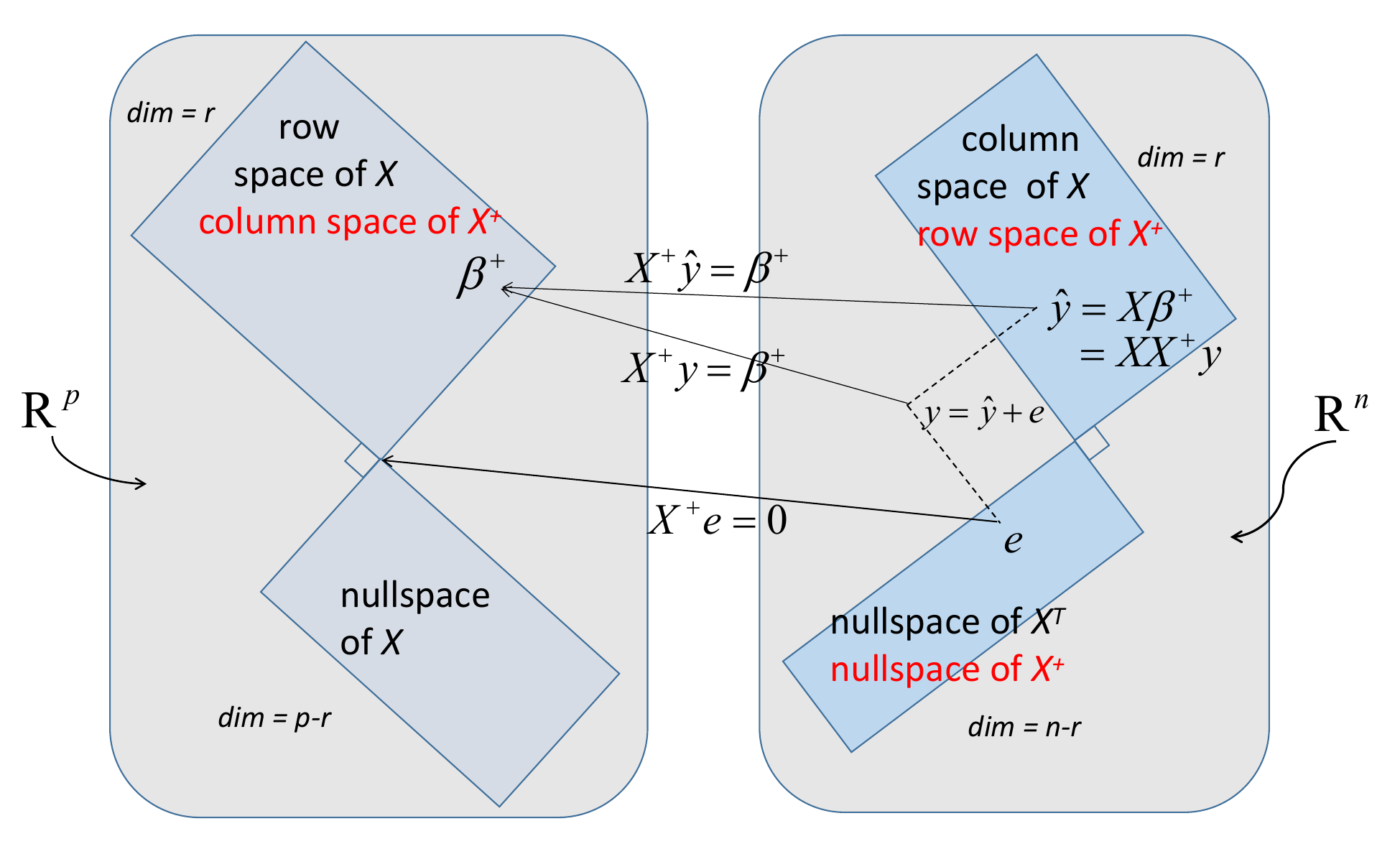}
\caption{Column space and row space of the pseudo-inverse $\bX^+$. $\bX$ transfers from the row space to the  column space. $\bX^+$ maps from the column space to the row space. The decomposition of $\by$ into $\widehatby+\be$ and the transformation to $\bbeta^+$ are discussed in Section~\ref{section:pseudo-in-svd}. \textit{This is a more detailed picture of the pseudo-inverse compared to Figure~\ref{fig:lafundamental4-LS-SVD}}.}
\label{fig:lafundamental5-pseudo}
\end{figure}

To conclude, we compare the properties for different inverses of $\bX$ in Table~\ref{table:different-inverses}.
\begin{table}[h!]
\centering
\resizebox{1.\textwidth}{!}{
\begin{tabular}{c|l|l|l}
\hline
& \multicolumn{1}{c|}{g-inverse}                                                                                   & \multicolumn{1}{c|}{rg-inverse}                                                                                                                                                                             & pseudo-inverse                                                                                                                                                                                                                                                                \\ \hline\hline
subspaces & \begin{tabular}[c]{@{}l@{}}$\cspace(\bX\bX^-) = \cspace(\bX)$\\ $\nspace(\bX^-\bX) = \nspace(\bX)$\end{tabular}  & \begin{tabular}[c]{@{}l@{}}$\cspace(\bX\bX_r^-) = \cspace(\bX)$ \\ $\nspace(\bX_r^-\bX) = \nspace(\bX)$\\ $\cspace(\bX_r^-\bX) = \cspace(\bX_r^-)$ \\ $\nspace(\bX\bX_r^-) = \nspace(\bX_r^-)$\end{tabular} & \begin{tabular}[c]{@{}l@{}}$\cspace(\bX\bX^+) = \cspace(\bX)=\cspace(\bX^{+\top})$ \\ $\nspace(\bX^+\bX) = \nspace(\bX)=\nspace(\bX^{+\top})$\\ $\cspace(\bX^+\bX) = \cspace(\bX^+)=\cspace(\bX^\top)$ \\ $\nspace(\bX\bX^+) = \nspace(\bX^+)=\nspace(\bX^\top)$\end{tabular} \\ \hline\hline
rank      & \begin{tabular}[c]{@{}l@{}}$\rank(\bX\bX^-)$\\ $=\rank(\bX^-\bX)$\\ $=\rank(\bX)$\\ $\leq \rank(\bX^-)$\end{tabular} & \begin{tabular}[c]{@{}l@{}}$\rank(\bX_r^-)$\\ $=\rank(\bX)$\\ $=\rank(\bX\bX_r^-)$\\ $=\rank(\bX_r^-\bX)$\end{tabular}                                                                                          & \begin{tabular}[c]{@{}l@{}}$\rank(\bX^+)$\\ $=\rank(\bX)$\\ $=\rank(\bX\bX^+)$\\ $=\rank(\bX^+\bX)$\end{tabular}                                                                                                                                                                  \\ \hline
\end{tabular}
}
\captionof{table}{Comparison of different inverses, presenting the subspaces and ranks of different inverses.}
\label{table:different-inverses}
\end{table}

\paragrapharrow{Pseudo-inverse in different cases.}
We conclude this section by presenting the pseudo-inverse for various types of matrices. Specifically, we define it in the following cases:
\paragraph{Case $n>p=r$.} That is, the matrix $\bX\in\real^{n\times p}$ has full column rank. In this case, 
$\bX^\top\bX$ is a $p\times p$ invertible matrix. And we define the left-pseudo-inverse:
$$
{\text{left-pseudo-inverse} = \bX^+ = (\bX^\top\bX)^{-1}\bX^\top},
$$
which satisfies
$
\bX^+ \bX =  (\bX^\top\bX)^{-1}\bX^\top\bX = \bI_p.
$
But 
$
\bX\bX^+ = \bX(\bX^\top\bX)^{-1}\bX^\top \neq\bI.
$
We can also show that $(\bX^+)^+ = \bX$. If $n>p=r$, we have 
$$
\begin{aligned}
(\bX^+)^+ 
&= [(\bX^\top\bX)^{-1}\bX^\top]^+ 
= \bX^{+\top}(\bX^+\bX^{+\top})^{-1}  \\
&= \left[(\bX^\top\bX)^{-1}\bX^\top\right]^{\top}\left\{\left[(\bX^\top\bX)^{-1}\bX^\top\right]\left[(\bX^\top\bX)^{-1}\bX^\top\right]^{\top}\right\}^{-1}\\
&=\bX (\bX^\top\bX)^{-1}\left\{(\bX^\top\bX)^{-1}\bX^\top\bX(\bX^\top\bX)^{-1}   \right\}^{-1}
=\bX.
\end{aligned}
$$

\paragraph{Case $p>n=r$.} That is, the matrix $\bX$ has full row rank.
In this case, 
$\bX\bX^\top$ is an $n\times n$ invertible matrix. And we define the right-pseudo-inverse:
$$
{\text{right-pseudo-inverse} = \bX^+ = \bX^\top(\bX\bX^\top)^{-1}},
$$
which satisfies 
$$
\bX\bX^+ = \bX\bX^\top(\bX\bX^\top)^{-1} = \bI_n.
$$
But 
\begin{equation}\label{equation:right-inverse-ba-appendix}
\bX^+ \bX =  \bX^\top(\bX\bX^\top)^{-1}\bX \neq \bI.
\end{equation}
Similarly, we can show $(\bX^+)^+ = \bX$ if $p>n=r$.

\paragraph{Case rank-deficient.} We delay the pseudo-inverse for rank-deficient matrices in the next section via the SVD.

\index{Rank-deficient}
\index{Rank-deficiency}

\paragraph{Case $n=p$ with full rank.} $\bX$ is a square invertible matrix,  then both the left and right pseudo-inverses reduce to the standard inverse of $\bX$:
$$
\begin{aligned}
\text{left-pseudo-inverse} &= \bX^+ = (\bX^\top\bX)^{-1}\bX^\top=\bX^{-1} \bX^{-\top} \bX^\top = \bX^{-1}; \\
\text{right-pseudo-inverse} &= \bX^+ = \bX^\top(\bX\bX^\top)^{-1} = \bX^\top\bX^{-\top}\bX^{-1} = \bX^{-1}.
\end{aligned}
$$

\index{Pseudo-inverse in SVD}
\subsection{Pseudo-Inverse in SVD}\label{section:pseudo-in-svd}
Given the SVD of a matrix $\bX$, we provide further discussion on the pseudo-inverse in different cases.
For the full SVD of matrix $\bX = \bU\bSigma\bV^\top$,  we consider the following cases:
\paragraph{Case $n>p=r$.} Since the matrix $\bX$ has independent columns, the left-pseudo-inverse can be obtained by 
$$
\begin{aligned}
\text{left-pseudo-inverse} 
&= \bX^+ = (\bX^\top\bX)^{-1}\bX^\top 
= (\bV \bSigma^\top \bU^\top\bU\bSigma\bV^\top)^{-1}\bV \bSigma^\top \bU^\top\\
&= \bV (\bSigma^\top\bSigma)^{-1}\bV^\top \bV \bSigma^\top \bU^\top 
= \bV[(\bSigma^\top\bSigma)^{-1}\bSigma^\top] \bU^\top 
= \bV \bSigma^+ \bU^\top,
\end{aligned}
$$
where the last equality follows because $\bSigma^+=(\bSigma^\top\bSigma)^{-1}\bSigma^\top$.
\paragraph{Case $p>n=r$.} Since the matrix $\bX$ has independent rows, the right-pseudo-inverse can be obtained by 
$$
\begin{aligned}
\text{right-pseudo-inverse} 
&= \bX^+ = \bX^\top(\bX\bX^\top)^{-1}
= (\bU\bSigma\bV^\top)^\top[(\bU\bSigma\bV^\top)(\bU\bSigma\bV^\top)^\top]^{-1}\\
&= \bV\bSigma^\top\bU^\top(\bU\bSigma\bV^\top\bV\bSigma^\top\bU^\top)^{-1} 
= \bV\bSigma^\top\bU^\top\bU^{-\top}(\bSigma\bSigma^\top)^{-1}\bU^{-1} \\
&= \bV\bSigma^\top(\bSigma\bSigma^\top)^{-1}\bU^{-1} 
=\bV \bSigma^+ \bU^\top,
\end{aligned}
$$
where the last equality follows because $\bSigma^+=\bSigma^\top(\bSigma\bSigma^\top)^{-1}$.

\paragraph{Case rank-deficient.} $\bX^+=\bV \bSigma^+ \bU^\top$,
where the upper-left side of $\bSigma^+ \in \real^{p\times n}$ is a diagonal matrix $\diag(\frac{1}{\sigma_1}, \frac{1}{\sigma_2}, \ldots, \frac{1}{\sigma_r})$. It can be easily verified that this definition of the pseudo-inverse satisfies the four conditions in Equation~\eqref{equation:pseudi-four-equations}.

In all cases, we have $\bSigma^+$ as the pseudo-inverse of $\bSigma$ with $1/\sigma_1, 1/\sigma_2, \ldots, 1/\sigma_r$ on its diagonal. We thus conclude the pseudo-inverse from SVD:
\begin{equation}\label{equation:pseudo_use_svd}
\bX^+ = \bV 
\begin{bmatrix}
	\bSigma^{-1}_1 & \bzero \\
	\bzero & \bzero
\end{bmatrix} 
\bU^\top,
\quad 
\text{with }
\bSigma = 
\begin{bmatrix}
	\bSigma_1 & \bzero \\
	\bzero & \bzero
\end{bmatrix} 
\text{ and }
\bSigma_1\in\real^{r\times r}.
\end{equation}
See also Table~\ref{table:pseudo-inverse-svd}.
\begin{table}[h!]\centering
\begin{tabular}{c|c|c|c|c}
\hline
& $\bX$                & $\bX^\top$           & $\bX^+$                & $\bX^{+\top}$          \\ \hline\hline
SVD & $\bU\bSigma\bV^\top$ & $\bV\bSigma\bU^\top$ & $\bV\bSigma^+\bU^\top$ & $\bU\bSigma^+\bV^\top$ \\ \hline
\end{tabular}\caption{Pseudo-inverse in SVD.}\label{table:pseudo-inverse-svd}
\end{table}
If $\bX$ is nonsingular, then $\bX^+ = \bX^{-1}$, so \eqref{equation:pseudo_use_svd} is a generalization of the usual inverse.
The pseudo-inverse of a scalar is
$$
\sigma^+ = 
\begin{cases}
1/\sigma, & \text{if } \sigma \neq 0; \\
0, & \text{if } \sigma = 0.
\end{cases}
$$
This shows the important fact that the pseudo-inverse $\bX^+$ is not a continuous function of $\bX$, unless we allow only perturbations that do not change the rank of $\bX$. The pseudo-inverse has the property
$$
\bX^+ = \lim_{\delta \to 0} (\bX^\top \bX + \delta \bI)^{-1} \bX^\top.
$$

Using the SVD representation of pseudo-inverses in \eqref{equation:pseudo_use_svd}, we can derive additional properties of the pseudo-inverse of a matrix $\bX$.
\begin{lemma}[Properties of pseudo-inverse using SVD]\label{lemma:prop_pseu_use_svd}
Let $\bX\in\real^{n\times p}$. The following properties of the pseudo-inverse follow from \eqref{equation:pseudo_use_svd}.
\begin{enumerate}[(i)]
\item $\bX^+ = (\bX^\top \bX)^+ \bX^\top = \bX^\top (\bX\bX^\top)^+$.
\item \label{equation:pseu_full_col_trans} When $\rank(\bX) = p$, this becomes $\bX^+ = (\bX^\top \bX)^{-1} \bX^\top, \ (\bX^\top)^+ = \bX(\bX^\top \bX)^{-1}$.
\item $(\bX^+)^+ = \bX$.
\item $(\alpha \bX)^+ = \alpha^+ \bX^+$.
\item $(\bX^+)^\top = (\bX^\top)^+$.
\item $(\bX^\top \bX)^+ = \bX^+ (\bX^\top)^+$.
\item $\bX$, $\bX^\top$, $\bX^+$, and $\bX^+ \bX$ all have rank equal to $\trace(\bX^+ \bX)$.
\item If $\bU$ and $\bV$ are orthogonal, $(\bU \bX \bV^\top)^+ = \bV \bX^+ \bU^\top$.
\item If $\bX = \sum_i \bX_i$, where $\bX_i \bX_j^\top = \bzero$, $\bX_i^\top \bX_j = \bzero$, $i \neq j$, then $\bX^+ = \sum_i \bX_i^+$.
\item If $\bX$ is normal ($\bX\bX^\top = \bX^\top \bX$), then $\bX^+ \bX = \bX\bX^+$ and $(\bX^n)^+ = (\bX^+)^n$.
\end{enumerate}
\end{lemma}

For the pseudo-inverse, the relations $\bX\bX^+ = \bX^+ \bX$ and $(\bX \bY)^+ = \bY^+ \bX^+$ are not in general true. For example, let $\bX = \footnotesize [1 , 0] $ and $\bY = [1 , 1]^\top$. Then $\bX \bY = 1$, but
$ \bY^+ \bX^+ = \frac{1}{2} [1 , 1] \begin{bmatrixfoot} 1 \\ 0 \end{bmatrixfoot} =  \frac{1}{2}. $
Necessary and sufficient conditions for the identity $(\bX \bY)^+ = \bY^+ \bX^+$ to hold were established by \citet{greville1966note}. The following theorem provides an important sufficient condition under which this equality holds.

\begin{theoremHigh}[Sufficient Condition for $(\bX \bY)^+ = \bY^+ \bX^+$]\label{theorem:suff_pro_pseudo}
If $\bX \in \real^{n \times p}$, $\bY \in \real^{p \times m}$, and $\rank(\bX) = \rank(\bY) = p$, 
then
$ (\bX \bY)^+ = \bY^+ \bX^+ = \bY^\top (\bY \bY^\top)^{-1} (\bX^\top \bX)^{-1} \bX^\top. 
$
\end{theoremHigh}
\begin{proof}[of Theorem~\ref{theorem:suff_pro_pseudo}]
The last equality follows from Lemma~\ref{lemma:prop_pseu_use_svd}.\ref{equation:pseu_full_col_trans}. The first equality follows from the proof of Lemma~\ref{lemma:existence-of-pseudo-inverse} and  is verified by showing that the four Penrose conditions are satisfied.
\end{proof}

%

\paragrapharrow{Another way to see the subspaces in pseudo-inverse via SVD.}
Using SVD, we provide an alternative perspective on demonstrating the four fundamental subspaces associated with the pseudo-inverse introduced in Theorem~\ref{theorem:pseudo-four-basis-space} and Figure~\ref{fig:lafundamental5-pseudo}.
From Lemma~\ref{lemma:rank-of-symmetric}, consider the symmetric matrix $\bX^+ \bX^{+\top}$ and its spectral decomposition: $\bX^+ \bX^{+\top} = \bV(\bSigma^\top\bSigma)^{-1}\bV^\top$.
The column space $\cspace(\bX^+ \bX^{+\top})$ is spanned by the eigenvectors of this decomposition. 
Therefore, the set $\{\bv_1,\bv_2 \ldots, \bv_r\}$ forms an orthonormal basis for $\cspace(\bX^+ \bX^{+\top})$.

We now establish the following results:
\begin{enumerate}[(i)]
\item Since $\bX^+ \bX^{+\top}$ is symmetric, its row space coincides with its column space.

\item Every column of $\bX^+ \bX^{+\top}$ is a linear combination of the columns of $\bX^+$. Therefore, the column space of $\bX^+ \bX^{+\top}$ $\subseteq$ the column space of $\bX^+$, i.e., $\cspace(\bX^+ \bX^{+\top}) \subseteq \cspace(\bX^+)$. 

\item  $\rank(\bX^+ \bX^{+\top}) = \rank(\bX^+)$ by Lemma~\ref{lemma:rank-of-ata-x}. 

Consequently, the row space of $\bX^+ \bX^{+\top}$ = the column space of $\bX^+ \bX^{+\top}$ =  the column space of $\bX^+$, i.e., $\cspace(\bX^+ \bX^{+\top}) = \cspace(\bX^+ )$. \textcolor{black}{Consequently, $\{\bv_1, \bv_2,\ldots, \bv_r\}$ constitutes an orthonormal basis of $\cspace(\bX^+)$}. We also proved in Theorem~\ref{theorem:svd-four-orthonormal-Basis} that $\{\bv_1, \bv_2,\ldots, \bv_r\}$ is an orthonormal basis of the row space of $\bX$ (i.e., basis of $\cspace(\bX^\top)$). Thus, $\cspace(\bX^+)=\cspace(\bX^\top)$, as shown in Figure~\ref{fig:lafundamental5-pseudo}.
\end{enumerate}

Similarly, if we apply this process to $ \bX^{+\top}\bX^+$, we can show that the row space of $\bX^+$ is equal to the column space of $\bX$, and the null space of $\bX^+$ is equal to the null space of $\bX^\top$.

Now, consider a vector $\bbeta^+$ in the row space of $\bX$. 
Since  $\{\bv_1, \bv_2, \ldots, \bv_r\}$ forms an orthonormal basis for the row space of $\bX$, we can express $\bbeta^+$ as $\bbeta^+ = \sum_{i=1}^{r} x_i \bv_i$   (see Theorem~\ref{theorem:svd-four-orthonormal-Basis}). For a vector $\widehatby = \bX\bbeta^+$ in the column space of $\bX$, we have $\widehatby = \bU\bSigma\bV^\top \bbeta^+$ and 
$$
\bX^+\widehatby = \bV\bSigma^+\bU^\top \bU\bSigma\bV^\top \bbeta^+ = \bV\bSigma^+\bSigma\bV^\top \bbeta^+ =\left(\sum_{i=1}^{r} \bv_i \bv_i^\top\right) \left(\sum_{i=1}^{r} x_i \bv_i\right) = \sum_{i=1}^{r} x_i \bv_i = \bbeta^+.
$$

For any vector $\be$ in the null space of $\bX^\top$, we have $\bX^+\be=\bzero$, since $\nspace(\bX^+)=\nspace(\bX^\top)$. Any vector $\by \in \real^n$ can be decomposed into $\by = \widehatby+\be$, where $\widehatby$ is a vector in the column space of $\bX$, and $\be$ is a vector in the null space of $\bX^\top$. That is,
$$
\bX^+ \widehatby = \bX^+\by = \bbeta^+,
$$
where $\bbeta^+$ lies in the row space of $\bX$.

In conclusion, for any vector $\bbeta^+$ in row space of $\bX$, we have 
$$
\widehatby = \bX\bbeta^+ 
\quad \implies\quad 
\bX^+\widehatby=\bbeta^+,
$$
and the relationship is illustrated  in Figure~\ref{fig:lafundamental5-pseudo}.

\begin{problemset}

\item \label{prob:tr_de_pd} \textbf{Trace, det of PD/PSD/ND matrices.} Let $\bA$ be positive definite (resp. positive semidefinite). Show that $\trace(\bA), \det(\bA)$, and the principal minors of $\bA$ are all positive (resp. nonnegative). Moreover, $\trace(\bA)=0$ if and only if $\bA=\bzero$. 
Let $\bB\in\real^{n\times n}$ be negative definite. Show that $\trace(\bB)$ is negative; $\det(\bB)$ is negative for odd $n$ and positive for even $n$.
\textit{Hint: Use Theorem~\ref{theorem:eigen_charac}.}

\item Prove Remark~\ref{remark:equiva_nonsingular} and Remark~\ref{remark:equiva_singular}.

\item Demonstrate that the vector $\ell_2$ norm, the matrix Frobenius norm, and the matrix spectral norm satisfy the three criteria outlined in Definition~\ref{definition:matrix-norm}.

\item \label{problem:multiGauss} Suppose we can generate the univariate Gaussian variable $\normal(0, 1)$. Provide a way to generate the multivariate Gaussian variable $\normal(\bmu, \bSigma)$, where $\bSigma=\bC\bC^\top$, $\bmu\in\real^n$, and $\bSigma\in\real^{n\times n}$. \textit{Hint: if $\rx_1, \rx_2, \ldots, \rx_n$ are i.i.d. from $\normal(0, 1)$, and let $\rvx=[\rx_1, \rx_2, \ldots, \rx_n]^\top$, then it follows that $\bC\rvx +\bmu\sim \normal(\bmu, \bSigma)$}.

\item Prove Lemma~\ref{lemma:affine_mult_gauss} and Lemma~\ref{lemma:rotat_multi_gauss}.

\item Following the Jacobian in the change-or variables formula for the Gamma distribution and inverse-Gamma distribution, and the definition of the Chi-squared distribution provided in Definition~\ref{definition:chisquare_dist}, derive the Jacobian in the change-of-variables formula for the Chi-squared distribution and the inverse-Chi-squared distribution.

\item \label{problem:kl_vae}
\textbf{KL of Gaussians.} Given two probability distribution function $P(\bx)$ and $Q(\bx)$, we denote  the \textit{Kullback-Leibler (KL) divergence} between $P$ and $Q$ by $\KL[P \parallel Q] = \int P(\bx) \ln \left( \frac{P(\bx)}{Q(\bx)} \right) d\bx \geq 0$, where the equality is obtained only when $P=Q$.
Given $q(\bx)=\normal( \bmu, \diag(\bsigma^2))$ and $p(\bx)=\normal( \bzero_D, \bI_D)$ where $\bmu, \bsigma^2\in\real^D$ (Definition~\ref{sec:multi_gaussian_conjugate_prior}). Show that $\KL[q\parallel p]=\frac{1}{2}\sum_{i=1}^{D}(\mu_i^2+\sigma^2_i-\ln \sigma^2_i-1)$. This expression is commonly used as the KL loss  in fitting variational autoencoders \citep{lu2021bayes}.

\item \textbf{KL of Gaussians.} Suppose $p(x)=\normal(\mu_1, \sigma_1^2)$ and $q(x)=\normal(\mu_2,\sigma_2^2)$. Show that 
$$
\KL[p \parallel q ] = \ln\frac{\sigma_2}{\sigma_1} +\frac{\sigma_1^2+(\mu_1-\mu_2)^2}{2\sigma_2^2}-\frac{1}{2}.
$$ 
Consider the multivariate case, suppose $\normal_1(\bx)=\normal(\bmu_1, \bSigma_1)$ and $\normal_2(\bx)=\normal(\bmu_2,\bSigma_2)$ (Definition~\ref{sec:multi_gaussian_conjugate_prior}). Show that 
$$
\KL[\normal_1\parallel \normal_2] = 
\frac{1}{2}\ln\abs{\bSigma_2\bSigma_1^{-1}}+\frac{1}{2}\trace\bSigma_2^{-1}
\big( (\bmu_1-\bmu_2)(\bmu_1-\bmu_2)^\top +\bSigma_1-\bSigma_2 \big).
$$
More generally, consider a general distribution $p(\bx)$ and a multivariate Gaussian $\normal(\bx)=\normal(\bmu, \bSigma)$ with $\bx\in\real^D$. Show that 
$$
\KL[p\parallel \normal ]=
\int \frac{1}{2}(\bx-\bmu)^\top\bSigma^{-1}(\bx-\bmu) d\bx 
+
\frac{1}{2}\ln \abs{\bSigma}+\frac{D}{2}\ln2\pi + \int p(\bx)\ln p(\bx) d\bx.
$$

\item Given two Bernoulli distributions $p(x)=\bernoulli(x\mid p)$ and $q(x)=\bernoulli(x\mid q)$ (Equation~\eqref{equation:bbern_dist}), show that 
$$
\KL[p \parallel q] = p\ln\frac{p}{q} + (1-p) \ln \frac{1-p}{1-q}.
$$

\item \label{problem:entropy_mgau} \textbf{Entropy of Gaussians.}  A close quantity related to KL divergence is the \textit{entropy}. The entropy $\entropy[p(\bx)]$ of a distribution $p(\bx)$ is defined as 
$$
\entropy[p(\bx)] \triangleq -\int p(\bx)\ln p(\bx) d\bx.
$$
Given a multivariate Gaussian distribution $\rvx\sim \normal(\bmu,\bSigma)$ (Definition~\ref{sec:multi_gaussian_conjugate_prior}) where $\rvx\in\real^D$, show that 
$$
\entropy[\normal(\bmu,\bSigma)] = \frac{1}{2}\ln \abs{\bSigma}+ \frac{D}{2} \ln(2\pi e).
$$

\item \textbf{Properties of expectation.}
Let $\rx$ and $\ry$ be two random variables, and let $a, b$ be scalars. Show that $\Exp[a\rx+b\ry] = a\cdot\Exp[\rx]+b\cdot \Exp[\ry]$.
Given further a  function $h$, show that 
$$
\Exp[h(\rx)] = \sum_{x} h(x) \Pr(x)
\qquad \text{and}\qquad 
\Exp[h(\rx)] = \int_{-\infty}^{\infty} h(x) d F(x),
$$
in the discrete and continuous cases, respectively.

\item \label{prob:prop_expcond}\textbf{Properties of expectation of conditionals.}
Let $\rx$ and $\ry$ be two random variables, and let $h$ be a  function.
Show that
\begin{enumerate}
\item Note that $\Exp[\rx\mid\ry]$ is a function of $\ry$. However, if $\rx$ is independent of $\ry$, then $\Exp[\rx\mid \ry] = \rx$.
\item $\Exp[c\mid \rx] =\rx$, where $c$ is a constant.
\item Linearity: $\Exp[a\rx_1 + b\rx_2\mid \ry] = a\cdot\Exp[\rx_1\mid \ry] + b\cdot\Exp[\rx_2\mid \ry]$.
\item Conditional constant: $\Exp[h(\ry)\rx\mid \ry] = h(\ry)\Exp[\rx\mid \ry]$, where $h(\ry)$ is called a \textit{conditional constant} w.r.t. $\ry$.
\item Monotonicity: if $\rx_1 \leq \rx_2$, then $\Exp[\rx_1\mid \ry] \leq \Exp[\rx_2\mid \ry]$.
\item Tower property: $\Exp\left[\Exp[\rx\mid \ry]\mid h(\ry)\right] = \Exp[\rx\mid h(\ry)]$; that is, $h(\ry)$ conveys information at most as $\ry$.
\item Unbiasedness: $\Exp\big\{\Exp[h(\rx,\ry)\mid \ry]\big\} = \Exp[h(\rx,\ry)]$; specially, $\Exp\big[\Exp[\rx\mid \ry]\big] = \Exp[\rx]$.
\item Least squares: $\Exp\big[\big(\ry - \Exp[\ry\mid \rx]\big)^2\big] \leq \Exp\big[\big(\ry - h(\rx)\big)^2\big]$ for any function $h$. This also means  $g(\rx)\triangleq  \Exp[\ry\mid \rx]$ is the best estimate in the least squares sense.
\end{enumerate}

\item \textbf{Sum of random variables by convolution.}
Let $ \rx $ and $ \ry $ be continuous random variables with probability density functions $ f_{\rx} $ and $ f_{\ry} $. Show that the density function of $ \rx + \ry $ is the convolution of $ f_{\rx} $ with $ f_{\ry} $:
$$
f_{\rx+\ry}(u) = \int_{-\infty}^{+\infty} f_{\rx}(u - v) f_\ry(v) \, dv.
$$

\item \textbf{Properties of variance and correlation.}
Let $\rx, \rx_1, \rx_2, \ry, \rvx$ be random variables or vectors, and  let $a,b$ be constants. 
Show that 
\begin{itemize}
\item Let $\bOmega$ be a real symmetric matrix. Then $\bOmega$ is positive semidefinite (Definition~\ref{definition:psd-pd-defini}) if and only if $\bOmega$ is the covariance matrix of some random vector $\rvx$.
\item $\Var[\rx] = \Exp[\rx^2] - (\Exp[\rx])^2 = \Cov[\rx,\rx]$.
\item $\Var[a\rx + b] = a^2 \Var[\rx]$.
\item $\Var\big[\sum_i \rx_i\big] = \sum_i \Var[\rx_i] + \sum_{i \neq j} \Cov[\rx_i, \rx_j]$.
\item $\Cov[\rx_1, \rx_2] = \Exp[\rx_1 \rx_2] - \Exp[\rx_1]\Exp[\rx_2]$.
\item $\Cov[a\rx_1 + b\rx_2, \ry] = a\cdot\Cov[\rx_1, \ry] + b\cdot \Cov[\rx_2, \ry]$; that is, covariance is linear in one variable.
\item If $\Exp[\rx_1^2] + \Exp[\rx_2^2] < \infty$, then the following are equivalent:
\begin{enumerate}
\item[(i)] $\Exp[\rx_1 \rx_2] = \Exp[\rx_1]\Exp[\rx_2]$;
\item[(ii)] $\Cov[\rx_1, \rx_2] = 0$;
\item[(iii)] $\Var[\rx_1 \pm \rx_2] = \Var[\rx_1] + \Var[\rx_2]$.
\end{enumerate}
Note that independence will imply these three last properties, but none of these properties imply independence.
\item Let $h$ be a nondecreasing function such that $\Exp[\rx^2]<\infty$ and $\Exp[h(\rx)^2]<\infty$. Then $\Cov[\rx, h(\rx)]>0$.
\end{itemize}

\item \textbf{Conditional variance and law of total variance.}
Let $\rx$ and $\ry$ be two random variables. The \textit{condition variance} of $\rx$ given $\ry$ is defined as 
\begin{equation}
\Var[\rx \mid \ry] \triangleq \Exp\big[ \big(\rx - \Exp[\rx \mid \ry]\big)^2 \mid \ry \big] =\Exp[\rx^2\mid \ry ] - (\Exp[\rx \mid \ry ])^2.
\end{equation}
The conditional variance tells us how much variance is left if we use $\Exp[\rx\mid \ry]$ to ``predict" $\rx$.
Prove the \textit{law of total variance}:
\begin{equation}
\Var[\rx] = \Exp\big[\Var[\rx\mid \ry]\big] + \Var\big[\Exp[\rx\mid \ry]\big].
\end{equation}
\textit{Hint: $\Var[\rx] = \Exp[\rx^2] -(\Exp[\rx])^2 = \Exp[\Exp[\rx^2\mid \ry ]] -(\Exp[\Exp[\rx\mid \ry]])^2$ by the unbiasedness property in Problem~\ref{prob:prop_expcond}.}
\end{problemset}

\newpage
\chapter{Least Squares Approximations}\label{chapter:ls_approx}
\begingroup
\hypersetup{
linkcolor=structurecolor,
linktoc=page,  
}
\minitoc \newpage
\endgroup
\section{Least Squares Approximations}\label{section:ls_approx1}
\lettrine{\color{caligraphcolor}L}
Linear models is a fundamental technique in solving regression problems, and its core method is the least squares approximation, which aims to minimize the sum of squared errors between predicted and observed values. 
This approach is particularly appropriate when the goal is to estimate the regression function that minimizes the expected squared error loss.
linear models have found wide application across various domains. Examples include decision making \citep{dawes1974linear}, time series analysis \citep{christensen1991linear, lu2017machine, lu2022reducing}, and fields such as production science, social science, and soil science \citep{fox1997applied, lane2002generalized, schaeffer2004application, mrode2014linear}.

We consider the  system 
\begin{equation}\label{equation:ls_nobias}
\by = \bX\bbeta,
\end{equation}
where $\bX\in \real^{n\times p}$ represents the input data matrix (or \textit{predictors}, \textit{predictor variables}, \textit{covariates}, \textit{features}).
These predictors may be numerical by nature, or derived from categorical variables through encoding.
In general, each column will be a function of the actual covariates $\bz_i$; for example $\bx_i = \boldsymbol{\varphi}(\bz_i), i\in\{1,2,\ldots,p\}$.~\footnote{For convenience, we slightly abuse the notation by letting $\bx_i$ denote the $i$-th row of the matrix $\bX$. 
In general, however, we use $\bx^{(i)}$ to represent rows of $\bX$ throughout this book.}
And $\by\in \real^n$ is the observation vector (or \textit{target, response, output variables, outcomes}), $n$ represents the number of observations (sample size), and $p$ denotes the number of features (dimension value). 
The vector $\bbeta$ constitutes a vector of weights of the linear model, which is called the \textit{coefficient vector} or \textit{weight vector}. 
In practice, a \textit{bias term} (also called the \textit{intercept})  is often included by adding a column of ones as the first column of $\bX$. This allows the least squares method to solve:
\begin{equation}\label{equation:ls-bias}
\widetildebX \widetilde{\bbeta} 
= 
[\bm{1} ,\bX ] 
\begin{bmatrix}
\beta_0\\
\bbeta
\end{bmatrix}
= \by .
\end{equation}
Equivalently, for each data point $i\in\{1,2,\ldots, n\}$, we have 
$$
y_i = \beta_0 +\beta_1x_{i1} + \beta_2 x_{i2} +\ldots +\beta_{p-1} x_{i,p-1}.
$$
For simplicity, we will denote $\widetildebX$ and $\widetilde{\bbeta}$ simply as $\bX$ and $\bbeta$, respectively, in all subsequent discussions unless otherwise specified.


\paragrapharrow{General thoughts.}
In many practical applications, we want to find an approximate solution to a problem or set of equations that, for noise reasons or whatever other reasons, does not have a solution, or not unrelatedly does not have a unique solution. A canonical example of this is given by the very \textit{overdetermined} (i.e., overconstrained) \textit{least squares (LS, or ordinary least squares, OLS)} problem or called the \textit{large sample problem} ($n\gg p$ in \eqref{equation:ls_nobias}), and this will be our focus for the next several sections.

The least squares method originated from the need to reduce the impact of measurement errors when fitting a mathematical model to observed data. One way to achieve this is by using more measurements ($n$) than unknown parameters ($p$) in the model.

While some (but not all) of the concepts we discuss can also apply to \textit{underdetermined} (i.e., \textit{underconstrained}) LS problems (see Section~\ref{section:ls-via-svd}), we will primarily focus on the simpler and more commonly used case of overdetermined LS problems.

Let $\bX \in \real^{n \times p}$ and $\by \in \real^n$ be given. If $n \gg p$, in which case there are significantly more rows/constraints than columns/variables, then in general, there does not exist a vector $\bbeta$ such that $\bX\bbeta = \by$. 
Define the column space of $\bX$, denoted $\cspace(\bX)$, as $\{\bX\bgamma\mid  \forall\, \bgamma \in \real^p\}$.
Thus, the meaning of $\by = \bX\bbeta$ has no solution is that $\by$ has a part that sits outside the column space of $\bX$.
That is, $\by \in \real^n$, but $\cspace(\bX)$ is a $p$-dimensional subspace of $\real^n$, and so with even a little noise, numerical instability, etc., there will be a part of $\by$ that is not captured as a linear combination of the columns of $\bX$.
In other words, the error $\be = \by -\bX\bbeta$ cannot be reduced to zero.

In this case, a popular way to find the ``best'' vector $\bbeta$ such that $\bX\bbeta \approx \by$ is to minimize the norm of the residuals, i.e., to solve $\min_{\bbeta \in \real^p} \norm{\by - \bX\bbeta}$, where $\norm{\cdot}$ is some norm. The most popular choice is the Euclidean or $\ell_2$ norm (Definition~\ref{definition:vec_l2_norm}). 
In this case, the LS problem is to minimize the sum of squares of the residual, i.e., to solve
\begin{equation}\label{equation:ls_l2norm}
\min_{\bbeta \in \real^p} \normtwo{\by - \bX\bbeta}.
\end{equation}
If we let $\bX^+$ denote the  pseudo-inverse of $\bX$ (Section~\ref{section:pseudo-inverse}), then the solution to this minimization problem is:
\begin{equation}\label{equation:ls_solu_pseudo}
	\widehatbbeta = \bX^+ \by = \argmin_{\bbeta\in\real^p} \normtwobig{\by-\bX{\bbeta}}.
\end{equation}
The vector $\widehatbbeta$ is called a {least squares} solution, i.e., $\normtwobig{\by-\bX\widehatbbeta}^2$ is minimized.

Actually, we should note that $\bbeta = \bX^+ \by + \bxi$, where $\bxi \perp \cspace(\bX^\top)$, i.e., where $\bxi \in \real^n$ is any vector orthogonal to the row span of $\bX$ ($\bxi\in\nspace(\bX)$), solves the LS problem given in \eqref{equation:ls_solu_pseudo}.
Among all such solutions, the one given by $\bbeta = \bX^+ \by$  is the minimal ($\ell_2$) norm solution to the LS problem. 
We will be interested in working with this shortest or minimum-norm solution (Theorems~\ref{theorem:qr-for-ls-urv} and \ref{theorem:svd-deficient-rank}).

In most practical settings, the matrix $\bX$ will have full column rank, especially when working with real-world data, which often naturally avoids collinearity, or after preprocessing steps that ensure linear independence among the columns. In this case, the least squares solution simplifies to:
\begin{equation}
\widehatbbeta =\bX^+\by \equiv (\bX^\top\bX)^{-1}\bX^\top\by.
\end{equation}

\index{Loss function}
\index{Objective function}
\paragrapharrow{Numerical methods and loss functions.}
In \eqref{equation:ls_solu_pseudo}, we show that $\widehatbbeta$ minimizes the residual norm: $\min_{\bbeta\in\real^p} \normtwobig{\by-\bX{\bbeta}}$.
However, the term \textit{least squares} comes from the fact that what we are actually minimizing is the square of this norm: $\min_{\bbeta\in\real^p} \normtwobig{\by-\bX{\bbeta}}^2$, i.e., the least sum of squared errors.
This is equivalent to minimizing the norm itself because the square function is monotonically increasing for nonnegative values. That is:
$$
\argmin_{\bbeta\in\real^p} \normtwobig{\by-\bX{\bbeta}} \equiv \argmin_{\bbeta\in\real^p} \normtwobig{\by-\bX{\bbeta}}^2.
$$
In many computer programs using numerical methods, e.g., \textit{gradient descent and conjugate descent methods}, $f_1(\bbeta)\triangleq\normtwobig{\by-\bX{\bbeta}}^2$ is defined as the \textit{loss function} (or called \textit{objective function}, \textit{cost function}) rather than $\normtwobig{\by-\bX{\bbeta}}$ since the gradient of the former one can be derived easily:
$$
\nabla f_1(\bbeta) = 2(\bX^\top\by - \bX^\top\bX\bbeta).
$$
Due to the factor of 2 in the gradient, it is often more convenient to define the loss function as: 
$$
f_2(\bbeta)\triangleq \frac{1}{2}\normtwobig{\by-\bX{\bbeta}}^2,
$$
which simplifies expressions during optimization.
Another common variation is to normalize the loss by the number of samples $n$, resulting in: 
$$
f_3(\bbeta) \triangleq \frac{1}{n} \normtwo{\bX\bbeta-\by}^2
\qquad\text{or}\qquad f_4(\bbeta) \triangleq \frac{1}{2n} \normtwo{\bX\bbeta-\by}^2,
$$ 
where $n$ is the sample size.
There are several reasons why normalization by $n$ is beneficial:
\begin{itemize}
\item \textit{Mathematical convenience.} The term $\normtwo{\bX\bbeta-\by}^2$ represents the squared error summed over all $n$ samples. Without dividing by $n$, the loss value grows proportionally with the number of samples. This makes it harder to interpret or compare across datasets of different sizes.
\item Dividing by $n$ normalizes the loss to represent the \textit{average squared error per sample}, which is independent of the dataset size. This allows for more meaningful comparisons between models trained on datasets of varying sizes.
\item \textit{Numerical stability.} For large datasets, the unnormalized loss $\normtwo{\bX\bbeta-\by}^2$ can become very large, leading to potential numerical instability during optimization. Dividing by $n$ keeps the loss values in a reasonable range, improving numerical stability.
\item Using the normalized form often makes gradient-based optimization methods (e.g., stochastic gradient descent) more stable because the gradients are scaled appropriately.
\item \textit{Statistical estimation.} In statistics, the least squares problem can be viewed as maximizing the likelihood under the assumption that the errors are normally distributed. The normalized loss $\frac{1}{n} \normtwo{\bX\bbeta-\by}^2$ corresponds to the \textit{mean squared error (MSE)}, which is a commonly used metric in statistical modeling.
\item  \textit{Consistency.} Many other loss functions in machine learning (e.g., cross-entropy loss, hinge loss) are expressed as averages over the samples. Dividing by $n$ ensures consistency with these conventions, making it easier to switch between different loss functions or combine them in composite objectives.
\end{itemize}
In other words, dividing the least squares loss by $n$ serves to normalize the loss, making it interpretable, mathematically convenient, statistically meaningful, consistent with other loss functions, and numerically stable. While it does not affect the solution ${\bbeta}$ itself, it improves the overall process of optimization and interpretation.
In this book, we will use these forms interchangeably depending on context and convenience.

\paragrapharrow{Matrix LS problems.}
In many scenarios,  we may also consider the \textit{matrix least squares} (a.k.a., \textit{multiple-response least squares}) problem, which is formulated as:
\begin{equation}\label{equation:matrix_ls_raw}
\bB^* = \argmin_{\bB \in \real^{p \times q}} \normf{\bX\bB - \bY}^2 
=\argmin_{\bB \in \real^{p \times q}} \sum_{i=1}^{q} \normtwo{\bX\bbeta_i-\by_i}^2,
\end{equation}
where $\bX \in \real^{n \times p}$, $\bB=[\bbeta_1,\bbeta_2, \ldots,\bbeta_q]\in\real^{p\times q}$, and $\bY=[\by_1, \by_2, \ldots,\by_q] \in \real^{n \times q}$. 
This, apparently, can be solved using $q$ least squares problems \eqref{equation:ls_l2norm}. 
In other words, the problem is \textit{column-wise decomposable}.
However, when the number of responses $q$ is much larger than both the number of samples $n$ and the number of features $p$ (i.e., $q\gg \max\{n,p\}$), it becomes impractical to transmit or store the full response matrix $\bY$, especially in communication-constrained environments.
To address this issue, we introduce \textit{sketched least squares with quantized response}, which reduces the amount of data that needs to be communicated while still allowing for accurate estimation. This approach will be discussed in detail in Section~\ref{section:ske_qu_ls}.

\paragrapharrow{General insights.}
This LS problem is ubiquitous and has many well-known interpretations. 
From a statistical perspective, it provides the \textit{best linear unbiased estimator (BLUE)} under certain assumptions about the data-generating process; see Section~\ref{section:beta-blue}.
From a geometric perspective, the solution corresponds to the orthogonal projection of the response vector $\by$ onto the column space of the design matrix $\cspace(\bX)$; see Section~\ref{section:by-geometry-hat-matrix}.
Note that the latter interpretation is basically a statement about the data at hand, while the former interpretation is basically a statement about models and unseen data. This parallels the algorithmic-statistical approaches:
\begin{itemize}
\item \textbf{Algorithmic perspective} (see Chapter~\ref{chapter:ls_approx_num}). From an algorithmic perspective, the relevant question is: How long does it take to compute the least squares solution $\widehatbbeta$? The answer to this question is that it takes $\mathcalO(np^2)$ time. This can be accomplished using one of several numerical algorithms---with the Cholesky decomposition (which is good if $\bX$ has full column rank and is very well-conditioned); or with a variant of the QR decomposition (which is somewhat slower, but more numerically stable); or by computing the full SVD $\bX = \bU \bSigma \bV^\top$ (which is often, but certainly not always, overkill, but which can be easier to explain), and letting $\widehatbbeta = \bV \bSigma^{+} \bU^\top \by$. 
Although these methods differ significantly in terms of implementation and numerical behavior, asymptotically they all require roughly the same amount of time---on the order of a constant multiple of $np^2$.

\item \textbf{Statistical perspective} (see Chapters~\ref{sec:lr-gaussian-noise} and \ref{chapter:model_eva_sel}). From a statistical perspective, the relevant question is: Under what conditions is computing $\widehatbbeta$ the appropriate choice? 
The answer to this question is that this LS optimization is the right problem to solve when the relationship between the responses and predictors is roughly linear, when there are no small number of components that are particularly important or influential (called leverage points), and when the error processes generating the data are ``nice'' (in the sense that the errors have mean zero, constant variance, are uncorrelated, and are normally distributed; or when we have adequate sample size to rely on large sample theory). 
When these assumptions hold, the LS estimate $\widehatbbeta$ has strong theoretical guarantees. However, if the assumptions are violated---even slightly---the performance of LS can degrade significantly.
Thus, from a statistical perspective, a natural next question to ask is: What should one do when the assumptions underlying the use of LS methods are not satisfied or are only imperfectly satisfied?
\end{itemize}

In the remaining sections of this chapter, we will explore least squares solutions from several complementary perspectives: calculus, convex optimization, linear algebra, and geometry. 
That is, to obtain and to interpret the least squares solutions from each of these viewpoints.

\section{Least Squares in the Big Picture}\label{section:ls_big_pic}

We return to the basic problem $\min_{\bbeta}\normtwo{\by-\bX\bbeta}^2$, which
is at the core of our discussion. For the least squares problem, we consider the following questions:
\begin{itemize}
\item Q1: What is the least squares solution?
\item Q2: When can uniqueness of the least squares solution be claimed?
\item Q3: When can uniqueness of the least squares solution with minimum-norm be claimed?
\end{itemize}

To address these questions, we require the following lemma regarding the optimality condition for the least squares problem.
\begin{lemma}[Optimality condition of LS]\label{lemma:opt_cond_ls}
Given $ \bX \in \real^{n\times p} $ and $ \by \in \real^n $, let
$$
\sB \triangleq \{\bbeta\in \real^p\mid \normtwo{\by-\bX\bbeta}^2 =\min \}
$$
denote  the set of all least squares solutions. Then, $ \bbeta \in \sB $ if and only if the following orthogonality condition holds:
\begin{equation}\label{equation:sca_nor_ls_eq}
\bX^\top (\by - \bX\bbeta) = \bzero.~\footnote{Alternatively, we can also state that since $f(\bbeta)=\normtwo{\by-\bX\bbeta}^2$ is convex, the solution $\widehatbbeta$ must satisfy $\nabla f(\widehatbbeta) = 2\bX^\top(\by-\bX\bbeta)=\bzero$.
This is known as the \textit{first-order optimality condition} for (local) optima points. Note the proof of the first-order optimality condition for multivariate functions strongly relies on the first-order optimality conditions for one-dimensional functions, which is also known as  \textit{Fermat's theorem}. See Proposition~\ref{proposition:fermat_fist_opt}.}
\end{equation}
\end{lemma}
\begin{proof}[of Lemma~\ref{lemma:opt_cond_ls}]
Assume that $ \widehatbbeta $ satisfies $ \bX^\top \be = \bzero $, where $ \be \triangleq \by - \bX \widehatbbeta $. Then for any $ \bbeta \in \real^p $, we have
$ \widetilde{\be} = \by - \bX\bbeta = \be + \bX (\widehatbbeta - \bbeta) \triangleq \be + \bX\bd. $
From this we obtain
$ \widetilde{\be}^\top \widetilde{\be} = (\be + \bX\bd)^\top (\be + \bX\bd) = \be^\top \be + \normtwo{\bX\bd}^2$,
which is minimized when $\bd=\bzero$; that is, $ \bbeta = \widehatbbeta $. 

Conversely, suppose $ \bX^\top \be \triangleq \balpha \neq \bzero $. If $ \bbeta = \widehatbbeta + \gamma \balpha $, then
$ \widetilde{\be}  = \by - \bX\bbeta = \be - \gamma \bX\balpha $
and
$ \widetilde{\be}^\top \widetilde{\be} = \be^\top \be - 2 \gamma \balpha^\top \balpha + \gamma^2 (\bX\balpha)^\top \bX \balpha < \be^\top \be $
for sufficiently small $ \gamma > 0 $. Hence $ \widehatbbeta $ is not a least squares solution, which leads to a contradiction. Hence, $\bX^\top\be$ must be zero.  
\end{proof}

We now present a unified view of the least squares problem in the following theorem. The underlying ideas will become clearer as we proceed. This theorem answers the question Q1 introduced at the beginning of this section.
\begin{theoremHigh}[A unified view of least squares problems]\label{theorem:unif_ls}
Let $\bX\in\real^{n\times p}$ and $\by\in\real^n$. 
Then the least squares problem $f(\bbeta)=\normtwo{\by-\bX\bbeta}^2$ has a minimizer $\widehatbbeta\in\real^n$ if and only if there exists a vector $\balpha\in\real^p$ such that 
\begin{equation}\label{equation:unif_ls}
\widehatbbeta=\bX^+\by+(\bI-\bX^+\bX)\balpha,
\end{equation}
where $\bX^+$ denotes the Moore-Penrose  pseudo-inverse of $\bX$ (Section~\ref{section:pseudo-inverse}):
\begin{itemize}
\item This shows that the least squares has a \textbf{unique} minimizer of $\widehatbbeta=\bX^+\by$ only when $\bX^+$ is a left inverse of $\bX$ (Definition~\ref{definition:one_side_inverse}, and $\bX$ is left-invertible only when $\bX$ has full column rank by Lemma~\ref{theorem:one-sided-invertible}). The solution in \eqref{equation:ls_solu_pseudo} is a special case of this result.
\item The optimal value is $f(\widehatbbeta)=\by^\top(\bI-\bX\bX^+)\by$.
\item If $\balpha\neq \bzero$: $\normtwo{\bX^+\by}\leq \normtwo{\bX^+\by+(\bI-\bX^+\bX)\balpha}$.
\end{itemize}
\noindent
This means that {any vector $\bbeta$ that minimizes $f(\bbeta)$ must be in this form}, where:
\begin{itemize}
\item $\bX^+\by\perp \nspace(\bX)$  (by Theorem~\ref{theorem:pseudo-four-basis-space}) is the {particular solution} (the minimum-norm solution).
\item $(\bI - \bX^+ \bX)\balpha \in \nspace(\bX)$ (by Theorem~\ref{theorem:orthogonal-from-pseudo-inverse}) is the {homogeneous solution} that accounts for the freedom in $\bbeta$ coming from the null space of $\bX$.
\end{itemize}
\end{theoremHigh}
\begin{proof}[of Theorem~\ref{theorem:unif_ls}]
By Lemma~\ref{lemma:opt_cond_ls}, the solution $\widehatbbeta$ must satisfy  $\bX^\top(\by-\bX\bbeta)=\bzero$.
This equation means that the vector $\by-\bX\bbeta$ is {orthogonal to the column space of $\bX$}, i.e., the error $\by-\bX\bbeta$ lies in the {null space of $\bX^\top$}:
\begin{equation}
\by-\bX\bbeta \perp \cspace(\bX)
\qquad\text{and} \qquad 
\by-\bX\bbeta\in \nspace(\bX^\top).
\end{equation}

To solve for $\bbeta$, we recognize that the equation $\bX\bbeta =\by$ may not always have an exact solution (i.e., when $\by\notin \cspace(\bX)$). Instead, we seek the {minimum-norm solution} that minimizes $\normtwo{\by - \bX\bbeta}$.
The {pseudo-inverse} $\bX^+$ provides the best possible solution by giving the unique {minimum-norm least squares solution} (see the argument in the sequel):
\begin{equation}
\bbeta_{\text{particular}} = \bX^+\by.
\end{equation}
To see $\bbeta_{\text{particular}}$ satisfies \eqref{equation:sca_nor_ls_eq}, we have 
$\bX^\top\bX\bbeta_{\text{particular}} =\bX^\top\bX\bX^+\by =\bX^\top\by$, where we used the fact that $\bX^\top\bX\bX^+=\bX^\top$ (Lemma~\ref{lemma:xtxxpllus_pseudo}).

However, this is just {one possible solution}. 
To see the full solution set, we consider two cases.
\paragraph{Case 1: $\by\in\cspace(\bX)$.}
For this case, the solution in \eqref{equation:unif_ls} is obvious by the properties ($\bX(\bI-\bX^+\bX)\balpha=\bzero$) and uniqueness of the pseudo-inverse of a matrix (Lemma~\ref{lemma:uniqueness-of-pseudo-inverse}).

\paragraph{Case 2: $\by\notin \cspace(\bX)$.} Let $\be\triangleq \by-\bX\bbeta_{\text{particular}} = \by-\bX\bX^+\by$. Since we can looking for solutions $\widetildebbeta$ that satisfy $\normtwobig{\by-\bX\widetildebbeta}^2 = \normtwo{\be}^2$. Two possible scenarios are:
$$
\bX\widetildebbeta-\by = \be
\qquad \text{and}\qquad 
\by-\bX\widetildebbeta = \be.
$$
The former scenario implies that $\be= \by-\bX\bbeta_{\text{particular}} = \bX\widetildebbeta-\by$, which leads to $\by = \bX\frac{\widetildebbeta+\bbeta_{\text{particular}}}{2}$, i.e., $\by\in\cspace(\bX)$, matching Case 1. 
Therefore, only the second scenario applies.
Since $\bbeta$ is in $\real^p$, the full space of solutions consists of the {particular solution} $\bbeta_{\text{particular}} = \bX^+\by$ plus any vector in the {null space of $\bX$} (i.e., any $\bbeta_{\text{null}}$ that satisfies $\bX\bbeta_{\text{null}} = \bzero$, $\bbeta_{\text{null}}\in	 \nspace(\bX)\triangleq \{\bz \mid \bX\bz = \bzero\}$).
Thus, the {general solution} for $\bbeta$ must then take the form:
\begin{equation}
	\bbeta = \bX^+\by + \bbeta_{\text{null}}, \qquad \text{with}\quad \bbeta_{\text{null}}\in\nspace(\bX).
\end{equation}
Since $\bX-\bX\bX^+\bX = \bzero$ by \eqref{equation:pseudi-four-equations}, the projection onto the {null space of $\bX$} is given by:
\begin{equation}
	\bbeta_{\text{null}} = (\bI - \bX^+ \bX)\balpha, \quad \text{for some }\balpha \in \real^p.
\end{equation}
This follows because $(\bI - \bX^+ \bX)$ is a {projection matrix} onto $\nspace(\bX)$ (see Theorem~\ref{theorem:orthogonal-from-pseudo-inverse} and Section~\ref{section:by-geometry-hat-matrix} for more details).

On the other hand, since $\bbeta_{\text{null}} \in \nspace(\bX)$ and $\cspace(\bX^+) \equiv \cspace(\bX^\top)$ (see Theorem~\ref{theorem:pseudo-four-basis-space}, Theorem~\ref{theorem:fundamental-linear-algebra}), it follows that
$$
\normtwo{\bX^+\by+(\bI-\bX^+\bX)\balpha}^2 = \normtwo{\bX^+\by}^2 + \normtwo{(\bI-\bX^+\bX)\balpha}^2.
$$
This shows that $\normtwo{\bX^+\by}\leq \normtwo{\bX^+\by+(\bI-\bX^+\bX)\balpha}$ if $\balpha\neq \bzero$ and completes the proof.
\end{proof}

\index{Normal equation}
\begin{definition}[Normal equation]\label{definition:normal_equ}
We can express the zero gradient of $\normtwo{\by-\bX\bbeta}^2$ w.r.t. $\bbeta$ as $\bX^\top\bX \widehatbbeta = \bX^\top\by$. The equation is also known as the \textit{normal equation}. 
Gauss developed an elimination method for solving the normal equation that uses pivots chosen from the diagonal \citep{stewart1995gauss}. Then all
reduced matrices are symmetric, and the storage and number of needed operations are reduced
by half. Later, the preferred way to implement this elimination process became the Cholesky decomposition.
\end{definition}

The above analysis shows that $\widehatbbeta$ is a least squares solution if and only if the residual $\be\triangleq \by - \bX\widehatbbeta$ is perpendicular to $\cspace(\bX)$ from the normal equation $\bX^\top (\by-\bX\bbeta)\triangleq \bX^\top\be=\bzero$:
\begin{equation}
\bX\widehatbbeta \in \cspace(\bX)
\qquad \text{and}\qquad 
\be=\by-\bX\widehatbbeta  \perp \cspace(\bX).
\end{equation}
For this reason, the residual $\be$ is sometimes denoted as $\by^{\perp}\triangleq \be =\by-\bX\widehatbbeta$.
Based on this, we can now state the following equivalent characterizations of a least squares solution.

\begin{corollary}[Least squares solution]\label{corollary:ls_equiv}
The following statements are equivalent:
\begin{enumerate}[(i)]
\item $ \widehatbbeta $ solves the least squares problem $\min_{\bbeta} \normtwo{\by-\bX\bbeta}$.
\item $ \widehatbbeta $ satisfies the normal equation $\bX^\top \bX \widehatbbeta = \bX^\top\by$.
\item The residual $ \be=\by-\bX\widehatbbeta $ is orthogonal to $\cspace(\bX)$.
\end{enumerate}
\end{corollary}

Since the least squares estimate is widely used in regression problems, it is important to understand how it generates predictions. After obtaining the least squares estimate  $\widehatbbeta$ from the data matrix $\bX$ and the response vector $\by$, the predicted value of $\by$ is given by 
$$
\widehatby = \bX\widehatbbeta.
$$
For a new input vector $\bx_{\new}$, the corresponding \textit{prediction} (also referred to as the ``\textit{curve-fit}") is simply computed as
$$
\widehat{y}_{\new} = \widehatbbeta^\top\bx_{\new}.
$$
Theorem~\ref{theorem:unif_ls} shows that $\bX^+\by$ is a minimum-norm solution for the least squares problem in general. 
Based on the properties of the pseudo-inverse, we can distinguish several important cases:
\begin{itemize}
\item \textbf{Large-sample least squares problem.} If $n>p=\rank(\bX)$, then the pseudo-inverse of $\bX$ is given by  $\bX^+ = (\bX^\top\bX)^{-1}\bX^\top$~\footnote{$\bX^\top\bX$ is nonsingular by Lemma~\ref{lemma:rank-of-ata-x}.}. This shows 
$$
\widehatbbeta=\bX^+\by+(\bI-\bX^+\bX)\balpha = (\bX^\top\bX)^{-1}\bX^\top\by.
$$
That is, the least squares solution is \textbf{unique} and answers the question Q2 introduced at the beginning of this section. 
This scenario---where there are more observations than predictors---is often referred to as the \textit{large-sample LS problem}. It will be discussed further in
Sections~\ref{section:ols_calculus}, \ref{section:ls_conv}, \ref{section:ls-fundation-theorem}, \ref{section:by-geometry-hat-matrix}, \ref{section:application-ls-qr}, and Chapter~\ref{sec:lr-gaussian-noise}.

\item \textbf{Rank-deficient least squares problem.} If $n>p>\rank(\bX)$, then the matrix $\bX$ does not have full rank and $\bX=\bV\bSigma^+\bU^\top\by$ is a minimizer, where $\bX=\bU\bSigma\bV$ denotes the SVD of $\bX$. This rank-deficient LS problem will be further discussed in Theorem~\ref{theorem:rank_def_ls_prop}, and Sections~\ref{section:ls-via-svd} and \ref{section:utv_ls}.

\index{High-dimensional LS}
\item \textbf{High-dimensional least squares problem.} If $p\geq n=\rank(\bX)$, then the pseudo-inverse of $\bX$ is given by $\bX^+ = \bX^\top(\bX\bX^\top)^{-1}$~\footnote{$\bX\bX^\top$ is nonsingular by a similar argument in Lemma~\ref{lemma:rank-of-ata-x}.}. 
In this case, the general least squares solution becomes
$$
\widehatbbeta=\bX^+\by+(\bI-\bX^+\bX)\balpha = \bX^\top(\bX\bX^\top)^{-1}\by+(\bI-\bX^\top(\bX\bX^\top)^{-1}\bX)\balpha.
$$
Here, the minimum-norm solution is $\widehatbbeta_{\text{mn}} \triangleq \bX^\top(\bX\bX^\top)^{-1}\by$. 
This high-dimensional LS problem will be further discussed in Section~\ref{section:ls-via-svd}.
\end{itemize}

\paragrapharrow{Matrix least squares problems.}
When the response variable $\by$ is extended from a vector to a matrix $\bY$, the problem becomes what is known as a \textit{matrix least squares problem}. 
A direct consequence of Theorem~\ref{theorem:unif_ls} is the following result for such problems.
\begin{corollary}[Matrix least squares]\label{corollary:matrix_ls}
Let $\bX\in\real^{n\times p}$.
\begin{itemize}
\item Given further $\bY\in\real^{p\times q}$, then the least squares problem $f_1(\bB) = \normf{\bX\bB-\bY}^2$ has a minimizer $\widehatbB=\bX^+\bY\in\real^{p\times q}$. 
\item Given further  $\bY\in\real^{q\times p}$, then the least squares problem $f_2(\bB) = \normf{\bB\bX-\bY}^2$ has a minimizer $\widehatbB=\bY\bX^+\in\real^{q\times n}$.
\end{itemize}
\end{corollary}

\subsection*{Rank-Deficient Least Squares Problems}
If $ r = \rank(\bX) < p $, then $ \bX $ has a null space of dimension $ p - r > 0 $. Then the problem $\min_{\bbeta} \normtwo{\bX\bbeta - \by}$ is \textit{rank-deficient}, and its solution is not unique. If $\widehatbbeta$ is a particular least squares solution, then the set of all least squares solutions is $ \sB = \{ \bbeta = \widehatbbeta + \balpha \mid \balpha \in \nspace(\bX) \} $. In this case we can seek the least squares solution of minimum-norm $\normtwo{\bbeta}$, i.e., solve
\begin{equation}\label{equation:min_norm_prob}
	\min_{\bbeta \in \sB}\normtwo{\bbeta}, \qquad \sB = \{ \bbeta \in \real^p \mid \normtwo{\by - \bX\bbeta} = \min \}.
\end{equation}
This solution is always unique (question Q3 introduced at the beginning of this section); see Theorem~\ref{theorem:rank_def_ls_prop}.
For the least squares problem, the set of all minimizers $\sB $ is convex. 
To see this, let $\bbeta_1, \bbeta_2 \in \sB$ and $\lambda \in [0,1]$. Then we have:
$$
\normtwo{\by-\bX(\lambda\bbeta_1 + (1-\lambda)\bbeta_2) } \leq \lambda\normtwo{\by-\bX\bbeta_1} +(1-\lambda)\normtwo{\by-\bX\bbeta_2} = \mathop{\min}_{\bbeta\in \real^p} \normtwo{\by-\bX\bbeta}. 
$$
Therefore, the convex combination $\lambda\bbeta_1 + (1-\lambda)\bbeta_2$ also belongs to $\sB$, which confirms that $\sB$ is a convex set.

\subsection*{Consistent System}
\index{Linear system}
\index{Consistent linear system}
Consider the linear system $\bX\bbeta = \by$, where $\bX\in\real^{n\times p}$.
If there is at least one solution to this system, it is called \textit{consistent}; otherwise, it is called \textit{inconsistent}. It can be shown that the system is consistent if and only if $\rank([\bX,\by])=\rank(\bX)$, i.e., the \textit{augmented matrix} $[\bX,\by]$ has the same rank as the observed coefficient matrix $\bX$.
This condition also implies that $\by$ lies in the column space of $\bX$.
For consistent systems, we can distinguish the following two cases:
\begin{itemize}
\item When $\bX$ has full column rank $p$, the linear system has a  \textbf{unique} solution: $\widehatbbeta = (\bX^\top\bX)^{-1}\bX^\top \by$. Refer to its description in the left inverse (Theorem~\ref{theorem:unique-linear-system-solution}).
\item When $\bX$ has full row rank $n$, the linear system has at least one solution: $\widehatbbeta = \bX_R^{-1}\by$, where $\bX_R^{-1}$ is a right inverse of $\bX$. Refer to its description in the right inverse (Theorem~\ref{theorem:always-have-solution-right-inverse}).
\end{itemize}

If the system $\bX\bbeta = \by$ is consistent, then the minimum-norm solution satisfies the normal equation of second kind; see Theorem~\ref{theorem:rank_def_ls_prop} below for more details:
\begin{equation}\label{equation:consis_minimunorm}
\text{(NE2)}:\qquad 
\bbeta = \bX^\top \bgamma
\quad\implies\quad 
 \quad \bX\bX^\top \bgamma = \by. 
\end{equation}
If $\rank(\bX) = n$, then $\bX\bX^\top$ is nonsingular, and the solution to \eqref{equation:consis_minimunorm} (i.e., the minimum-norm solution) is \textbf{unique}:
\begin{equation}
\widehatbbeta = \bX^\top (\bX\bX^\top)^{-1}\by\equiv \bX^+\by.
\end{equation}

\subsection*{Uniqueness of LS Problems}

From the result in \eqref{equation:consis_minimunorm} and Corollary~\ref{corollary:ls_equiv}, we obtain the following characterization of a solution to the least squares problem \eqref{equation:min_norm_prob}. It includes both the over- and underdetermined cases.

\index{Minimum-norm solution}
\begin{theoremHigh}[Minimum-norm solution]\label{theorem:rank_def_ls_prop}
Let $\bX\in\real^{n\times p}$ and $\by\in\real^n$.
And let $ \bbeta $ be a solution of the problem $\min_{\bbeta} \normtwo{\bX\bbeta - \by}$. Then $ \bbeta $ is a \textbf{unique} least squares solution of minimum-norm if and only if $ \bbeta \perp \nspace(\bX) $ or, equivalently, $ \bbeta = \bX^\top \bgamma $, $ \bgamma \in \real^n $.~\footnote{And Theorem~\ref{theorem:unif_ls} shows this minimum-norm solution is $\widehatbbeta_{\text{mn}}=\bX^+\by$, which satisfies $\widehatbbeta_{\text{mn}} \perp\nspace(\bX)$ since $\cspace(\bX^+) \equiv \cspace(\bX^\top)$ by Theorem~\ref{theorem:pseudo-four-basis-space}.}
That is, for $\bX\in\real^{n\times p}$ of any dimension and rank, the least squares solution of minimum norm $\normtwo{\bbeta}$ is \textbf{unique} and characterized by the conditions
\begin{equation}
	\be \triangleq \by - \bX\bbeta \perp \cspace(\bX) 
	\qquad \text{and} \qquad 
	\bbeta \perp \nspace(\bX).
\end{equation}
\end{theoremHigh}

\begin{proof}[of Theorem~\ref{theorem:rank_def_ls_prop}]
Assume $\bbeta\perp \nspace(\bX)$.
Let $\widetildebbeta$ be any least squares solution, and set $ \widetildebbeta = \bbeta + \balpha $, where $ \balpha \in \nspace(\bX) $. Then $ \bX \balpha = \bzero $, so $ \be =  \by - \bX\widetildebbeta = \by - \bX\bbeta $, and $\bbeta$ is also a least squares solution. Since $\bbeta\perp\nspace(\bX)$ and $\balpha\in\nspace(\bX)$, by the Pythagorean theorem, $\normtwobig{\widetildebbeta}^2 = \normtwo{\bbeta}^2  + \normtwo{\balpha}^2$, which is minimized when $ \balpha =\bzero $.
The reverse claim follows from the normal equation. 
\end{proof}

\subsection*{Augmented LS Problem}

From the normal equation $\bX^\top\bX\bbeta=\bX^\top\by$, we show that the error component $\be\triangleq \by-\bX\bbeta$ satisfies the orthogonality condition $\bX^\top\be=\bzero$. These forms a symmetric augmented system of $n+p$ equations:
\begin{equation}\label{equation:sys_aug_sys_v0}
\begin{bmatrix}
\bI & \bX \\
\bX^\top & \bzero
\end{bmatrix}
\begin{bmatrix}
\be \\
\bbeta
\end{bmatrix}
=
\begin{bmatrix}
\by \\
\bzero
\end{bmatrix},
\quad \by \in \real^n.
\end{equation}
Apparently, this augmented system is a special case of the following augmented system, which we call the \textit{augmented LS (AuLS) problem}:
\begin{equation}\label{equation:sys_aug_sys}
(\text{AuLS}):\qquad 
\begin{bmatrix}
\bI & \bX \\
\bX^\top & \bzero
\end{bmatrix}
\begin{bmatrix}
\balpha \\
\bbeta
\end{bmatrix}
=
\begin{bmatrix}
\by \\
\bz
\end{bmatrix},
\quad \by \in \real^n, \quad \bz \in \real^p,
\end{equation}

On the other hand, we also consider the high-dimensional problem (consistent underdetermined linear system), of which the minimum-norm solution is 
$$
(\text{MN}): \quad \min_{\balpha} \normtwo{\balpha}^2 \ \text{s.t.}\ \bX^\top\balpha =\bz
\quad\implies \quad
\balpha =\bX\bgamma 
\quad\implies\quad
\bX^\top\bX\bgamma =\bz, 
$$ 
~\footnote{Previously, we assume $\bX\in\real^{n\times p}$ with $\rank(\bX)=n$ for this consistent linear system. To make it consistent with the context, we use $\bX^\top$ with $\rank(\bX)=p$ here.}
where $\bX^\top\bX\bgamma =\bz$ is called the normal equation of the second kind; see \eqref{equation:consis_minimunorm}.
Therefore, the minimum-norm solution is also a special case of the AuLS system by setting $\bbeta=\bzero$ and $\by=\balpha$.

Therefore, both the standard least squares and minimum-norm problems are special cases of the augmented LS problem.
Such AuLS systems represent the equilibrium of a physical system and occur in many applications; see \citet{strang1988framework, bjorck2024numerical} for more details.
The system is nonsingular if and and only if  $\rank(\bX)=p$, and its inverse is 
\begin{equation}\label{equation:inv_aug_mat}
\begin{bmatrix}
\bI & \bX \\
\bX^\top & \bzero
\end{bmatrix}
=
\begin{bmatrix}
\bI - \bX(\bX^\top\bX)^{-1}\bX^\top & \bX(\bX^\top\bX)^{-1}\\
(\bX^\top\bX)^{-1}\bX^\top & -(\bX^\top\bX)^{-1}
\end{bmatrix},
\quad \text{when }\rank(\bX)=p.
\end{equation}
Note that  $\bH \triangleq \bX(\bX^\top \bX)^{-1} \bX^\top$ is the orthogonal projector onto $\cspace(\bX)$; see Section~\ref{section:by-geometry-hat-matrix}.

\begin{theoremHigh}
If $\rank(\bX) = p$, then the augmented system \eqref{equation:sys_aug_sys} has a unique solution that solves the primal and dual least squares problems,
\begin{align}
&\min_{\bbeta \in \real^p} \frac{1}{2} \normtwo{\by - \bX\bbeta}^2 + \bz^\top \bbeta, \label{equation:augsys_eq1}\\
&\min_{\balpha \in \real^n} \frac{1}{2} \normtwo{\balpha - \by}^2 \quad \text{s.t.} \quad \bX^\top \balpha = \bz. \label{equation:augsys_eq2}
\end{align}
\end{theoremHigh}

\begin{proof}
Differentiating \eqref{equation:augsys_eq1} gives $\bX^\top(\by - \bX\bbeta) = \bz$, which with $\balpha = \by - \bX\bbeta$ is the augmented system \eqref{equation:sys_aug_sys}. This system is also obtained by differentiating the Lagrangian
$ L(\bbeta, \balpha) = \frac{1}{2} (\balpha - \by)^\top (\balpha - \by) + \bbeta^\top (\bX^\top \balpha - \bz) $
for \eqref{equation:augsys_eq2} and equating to zero, where $\bbeta$ is the vector of Lagrange multipliers.
\end{proof}

\subsection*{Special Least Squares Systems}

If the columns $\bX[:,j]$ of $\bX$ are mutually orthogonal, then least squares problems simplify considerably. The reason is that $\bX^\top \bX$ is diagonal and then so is $(\bX^\top \bX)^{-1}$ in the full rank case. The result is that
$$
\widehatbeta_j = \frac{\sum_{i=1}^n x_{ij} y_i}{\sum_{i=1}^n x_{ij}^2} 
= \frac{\bX[:,j]^\top \by}{\normtwo{\bX[:,j]}^2}, \qquad \forall \, j \in\{1,2 \ldots, p\}.
$$
This means the coefficients can be computed independently, one at a time, without needing to solve a system of equations.
In the even more special case that each of $\bX[:,j]$ is a unit vector (or the columns $\bX$ are mutually orthogonormal), then $\widehatbeta_j = \bX[:,j]^\top \by$.

Orthogonal predictors bring great simplification. The cost of computation is only $O(np)$. The variance of $\widehatbbeta$ is $\sigma^2 \diag(1/\normtwo{\bX[:,j]}^2)$ (see Chapter~\ref{sec:lr-gaussian-noise} for more details) so the components of $\widehatbbeta$ are uncorrelated. In the Gaussian case, the $\widehatbeta_j$ are statistically independent in addition to the computational independence noted above.

\subsection*{Leave One Out Formula}
In this subsection we explore what happens to a (full-rank) least squares model when one data point (a row in $\bX$) is added or removed. We begin with the \textit{Sherman-Morrison formula}. Suppose that $\bA$ is an invertible $n \times n$ matrix, and let  $\bu$ and $\bv$ be $n$-dimensional vectors such that $1 + \bv^\top\bA^{-1}\bu \neq 0$. Then
\begin{equation}\label{equation:she_morr_for}
(\bA + \bu\bv^\top)^{-1} = \bA^{-1} - \frac{\bA^{-1}\bu\bv^\top\bA^{-1}}{1 + \bv^\top\bA^{-1}\bu}.
\end{equation}
This can be proved by multiplying the right-hand side of the equation by $\bA + \bu\bv^\top$ and checking that the product equals the identity matrix. The condition $1 + \bv^\top\bA^{-1}\bu \neq 0$ ensures that the updated matrix  $\bA + \bu\bv^\top$ remains invertible if $\bA$ is nonsingular.

Now suppose we delete the $i$-th observation from the data $\bX$. 
Then $\bX^\top \bX = \sum_{i=1}^n \bx_i \bx_i^\top$ is replaced by $(\bX^\top \bX)_{(-i)} = \bX^\top \bX - \bx_i \bx_i^\top$, where $\bx_i\in\real^p$ is a column vector representing  the $i$-th row of $\bX$, using a subscript of $(-i)$ to denote the removal of the  $i$-th observation. We can fit this into \eqref{equation:she_morr_for} by taking $\bu = \bx_i$ and $\bv = -\bx_i$. Then,
$$
\begin{aligned}
(\bX^\top \bX)^{-1}_{(-i)} 
&= (\bX^\top \bX)^{-1} + \frac{(\bX^\top \bX)^{-1}\bx_i \bx_i^\top(\bX^\top \bX)^{-1}}{1 - \bx_i^\top(\bX^\top \bX)^{-1}\bx_i}\\
&\triangleq (\bX^\top \bX)^{-1} + \frac{(\bX^\top \bX)^{-1}\bx_i \bx_i^\top(\bX^\top \bX)^{-1}}{1 - h_{ii}},
\end{aligned}
$$
where we let $\bH\triangleq \bX(\bX^\top\bX)^{-1}\bX^\top$, and $h_{ii}\triangleq \bx_i^\top(\bX^\top \bX)^{-1}\bx_i$. 
We also find that $(\bX^\top \by)_{(-i)} = \bX^\top \by - \bx_i y_i$. 
Therefore, the leave-one-out update for the least squares solution $\widehatbbeta$ becomes
$$
\begin{aligned}
\widehatbbeta_{(-i)} 
&= \Big( (\bX^\top \bX)^{-1} + \frac{(\bX^\top \bX)^{-1} \bx_i \bx_i^\top (\bX^\top \bX)^{-1}}{1 - h_{ii}} \Big) 
\left( \bX^\top \by - \bx_i y_i \right)\\
&= \widehatbbeta - (\bX^\top \bX)^{-1} \bx_i y_i + \frac{(\bX^\top \bX)^{-1} \bx_i \bx_i^\top \widehatbbeta}{1 - h_{ii}} - \frac{(\bX^\top \bX)^{-1} \bx_i h_{ii} y_i}{1 - h_{ii}} \\
&= \widehatbbeta + \frac{(\bX^\top \bX)^{-1} \bx_i \bx_i^\top \widehatbbeta}{1 - h_{ii}} - \frac{(\bX^\top \bX)^{-1} \bx_i y_i}{1 - h_{ii}} 
= \widehatbbeta - \frac{(\bX^\top \bX)^{-1} \bx_i (y_i - \widehaty_i)}{1 - h_{ii}},
\end{aligned}
$$
where $\widehaty_i \triangleq \bx_i^\top \widehatbbeta$.
Thus, the prediction for $y_i$ when $(\bx_i, y_i)$ is removed from the least squares fit is
$$
\widehaty_{i,(-i)} \triangleq \bx_i^\top \widehatbbeta_{(-i)} = \widehaty_i - \frac{h_{ii} (y_i - \widehaty_i)}{1 - h_{ii}}.
$$
Multiplying both sides by $1 - h_{ii}$ and rearranging gives:
\begin{equation}\label{equation:yhat_wei_yyhatloo}
\widehaty_i = h_{ii} y_i + (1 - h_{ii}) \widehaty_{i,(-i)}.
\end{equation}
Equation~\eqref{equation:yhat_wei_yyhatloo} has an important interpretation. The least squares fit $\widehaty_i$ is a weighted combination of $y_i$ itself and the least squares prediction we would have made for it, had it been left out of the fitting. The larger $h_{ii}$ is, the more that $\widehaty_i$ depends on $y_i$. It also means that if we want to compute a ``leave one out" residual $y_i - \widehaty_{i,(-i)}$, we don’t have to actually take $(\bx_i, y_i)$ out of the data and rerun the estimate. We can instead use
\begin{equation}
y_i - \widehaty_{i,(-i)} = \frac{y_i - \widehaty_i}{1 - h_{ii}}.
\end{equation}
This analysis will be important for the diagnostics for linear models; see Section~\ref{section:gau_diag}.

\section{OLS in Calculus}\index{OLS by calculus}\label{section:ols_calculus}
\index{Fermat's theorem}
\textit{Fermat's theorem}, also known as \textit{Fermat's theorem on stationary points}, is a fundamental result in calculus and mathematical optimization. It provides a necessary condition for a function to have a local optimum (either a local maximum or a local minimum) at a point inside the domain of the function.
For a univariate function, Fermat's theorem states the optimality condition for a optimal point that lies in the interior of a set, i.e., a one-dimensional constrained optimization problem.
\begin{proposition}[Fermat's theorem: necessary condition for univariate functions]\label{proposition:fermat_theorem}
	Let $f: (a,b)\rightarrow \real$ be a univariate differentiable function defined over an interval ($a, b$). 
	If a point $\widehattheta\in(a,b)$ (i.e., $\widehattheta\in\interior([a,b])$) is a local maximum or minimum, then $f^\prime(\widehattheta)=0$. 
\end{proposition}

In other words, if a function $f$  has a local maximum or minimum at a point $\widehattheta$, and $f$ is differentiable at that point, then the slope of the tangent line at $\widehattheta$ must be zero; that is, the derivative of $\widehattheta$ is zero.

It is important to note that this condition is necessary but \textbf{not} sufficient for $\widehattheta$ to be a local extremum. There are cases where the derivative is zero, but the point is neither a maximum nor a minimum---for example, at an inflection point.

Additionally, Fermat's theorem does \textbf{not} apply to boundary points of the domain of $f$, or to points where $f$ is not differentiable.

Most objective functions, especially those with multiple local minima, also contain local maxima and other critical points that satisfy the necessary condition given by Fermat's theorem. To distinguish true local minima from these irrelevant or non-optimal critical points, we rely on additional theorems and conditions, which help us better characterize and classify such points.

We now state the first-order necessary condition for a local minimum point in multivariate optimization.
\begin{proposition}[First-order necessary condition for a  minimum point]\label{proposition:fermat_fist_opt}
Let $f: \real^p \rightarrow \real$ be a  differentiable function. If $\widehatbtheta$ is a (local) minimizer for $f$, then
$$
\nabla f(\widehatbtheta) = \bzero.
$$
This is known as a \textit{stationary point} of $f$.
\end{proposition}
\begin{proof}[of Proposition~\ref{proposition:fermat_fist_opt}]
Let $i \in \{1, 2, \ldots, p\}$, and define the one-dimensional function $g(\mu) = f(\widehatbtheta + \mu \be_i)$. Note that $g$ is differentiable at $\mu = 0$ and that $g'(0) = \frac{\partial f}{\partial x_i}(\widehatbtheta)$. Since $\widehatbtheta$ is a local minimum point of $f$, it follows that $\mu = 0$ is a local minimum point of $g$, which immediately implies that $g'(0) = 0$ by Proposition~\ref{proposition:fermat_theorem}. This equality is exactly the same as $\frac{\partial f}{\partial x_i}(\widehatbtheta) = 0$. Since this holds for any $i \in \{1, 2, \ldots, p\}$, we obtain $\nabla f(\widehatbtheta) = \bzero$.
\end{proof}

When objective $\normtwo{\by-\bX\bbeta}^2$ is differentiable, and the parameter space of $\bbeta$ includes  the entire space $\real^p$ (so that the minimum occurs in the interior of the domain), the least squares estimate must occur at a point where the gradient of the function is zero. We thus come into the following theorem.

\index{Fermat's theorem}
\begin{theoremHigh}[Least squares by calculus]\label{theorem:ols}
Assume the observed data matrix $\bX \in \real^{n\times p}$ is fixed and has full rank (i.e., the columns of $\bX$ are linearly independent) with $n\geq p$. Consider the overdetermined system $\by = \bX\bbeta$, the least squares solution by calculus via setting the derivative in every direction of $\normtwo{\by-\bX\bbeta}^2$ to be zero is $\widehatbbeta = (\bX^\top\bX)^{-1}\bX^\top\by$. 
The value $\widehatbbeta = (\bX^\top\bX)^{-1}\bX^\top\by\equiv \bX^+\by$ is known as the \textit{ordinary least squares (OLS)} estimate or simply least squares (LS) estimate of $\bbeta$.
\end{theoremHigh}

\begin{proof}[of Theorem~\ref{theorem:ols}]
From Proposition~\ref{proposition:fermat_fist_opt}, a function $f(\bbeta)$ attains a minimum  at a point $\widehatbbeta$ if its gradient $\nabla f(\bbeta)=\bzero$. 
In our case, the objective function is $\normtwo{\by-\bX\bbeta}^2$, whose gradient is $2\bX^\top\bX\bbeta -2\bX^\top\by$. 
The condition  $2\bX^\top\bX\bbeta -2\bX^\top\by=\bzero $ thus aligns with the normal equation (Definition~\ref{definition:normal_equ}).
The matrix $\bX^\top\bX$ is invertible since we assume $\bX$ is fixed and has full rank (Lemma~\ref{lemma:rank-of-ata-x}). So the OLS solution of $\bbeta$ is $\widehatbbeta = (\bX^\top\bX)^{-1}\bX^\top\by$, from which the result follows.
\end{proof}

\index{Convex functions}
\begin{figure}[h!]
	\centering  
	\vspace{-0.35cm} 
	\subfigtopskip=2pt 
	\subfigbottomskip=2pt 
	\subfigcapskip=-5pt  
	\subfigure[A convex function.]{\label{fig:convex-1}
		\includegraphics[width=0.31\linewidth]{./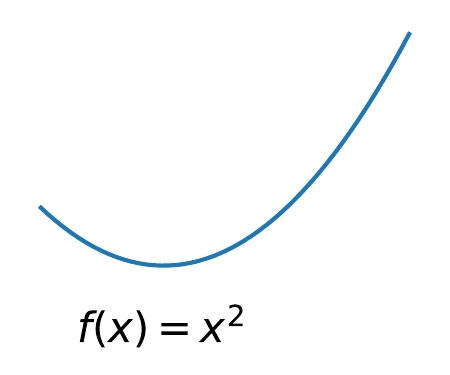}}
	\subfigure[A concave function.]{\label{fig:convex-2}
		\includegraphics[width=0.31\linewidth]{./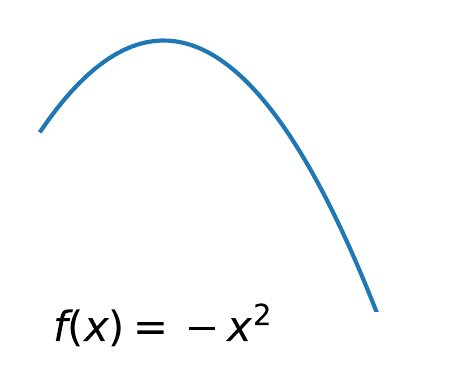}}
	\subfigure[A random function.]{\label{fig:convex-3}
		\includegraphics[width=0.31\linewidth]{./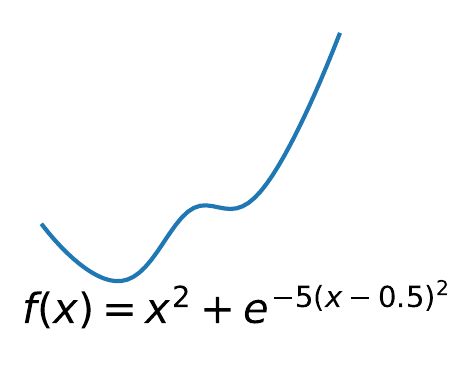}}
	\caption{Three functions.}
	\label{fig:convex-concave-none}
\end{figure}

However, we cannot be certain whether the least squares estimate obtained in Theorem~\ref{theorem:ols} corresponds to a minimum, maximum, or neither. An illustrative example is shown in Figure~\ref{fig:convex-concave-none}. 
Our current understanding only confirms that the function $\normtwo{\by-\bX\bbeta}^2$ has a single critical point (a root of its gradient), which is a \textbf{necessary} condition for a minimum---but not necessarily a \textbf{sufficient} one.
Further clarification on this issue is provided in the following remark. Alternatively, a more general explanation can be given using convex analysis (see Section~\ref{section:ls_conv}).
\begin{remark}[Verification of least squares solution]

Why does a zero gradient imply the least mean squared error?
We avoid discussing convexity (as we will shortly see) in detail here in order to keep things simple. However, we directly verify that the OLS solution indeed minimizes the sum of squared errors.
For any $\bbeta \neq \widehatbbeta$, consider the following expansion:
\begin{equation}
\begin{aligned}
\normtwo{\by - \bX\bbeta}^2 
&= \normtwobig{\by - \bX\widehatbbeta + \bX\widehatbbeta - \bX\bbeta}^2
= \normtwobig{\by-\bX\widehatbbeta + \bX (\widehatbbeta - \bbeta)}^2 \\
&=\normtwobig{\by-\bX\widehatbbeta}^2 + \normtwobig{\bX(\widehatbbeta - \bbeta)}^2 + 2\big(\bX(\widehatbbeta - \bbeta)\big)^\top(\by-\bX\widehatbbeta) \\ 
&=\normtwobig{\by-\bX\widehatbbeta}^2 + \normtwobig{\bX(\widehatbbeta - \bbeta)}^2 + 2(\widehatbbeta - \bbeta)^\top(\bX^\top\by - \bX^\top\bX\widehatbbeta), \nonumber
\end{aligned} 
\end{equation}
where the third term is zero due to the normal equation, and it also follows that  $\normtwobig{\bX(\widehatbbeta - \bbeta)}^2 \geq 0$. Therefore,
\begin{equation}
\normtwobig{\by - \bX\bbeta }^2 \geq \normtwobig{\by-\bX\widehatbbeta}^2. \nonumber
\end{equation}
This shows that the OLS estimate corresponds to a minimum, rather than a maximum or saddle point, using a calculus-based argument.
In fact, the condition arising from the least squares estimate is also referred to as the \textit{sufficiency of stationarity under convexity}. When $\bbeta$ is defined across the entire space $\real^p$, this condition is alternatively recognized as the \textit{necessity and sufficiency of stationarity under convexity}.
\end{remark}

A natural question arises: Why does the normal equation magically produce a solution for $\bbeta$? 
A simple example will help clarify this idea.  The equation $x^2=-1$ has no real solution. But $x\cdot x^2 = x\cdot (-1)$ has a real solution $\widehat{x} = 0$, in which case $\widehat{x}$ makes $x^2$ and $-1$ as close as possible.

\index{Change solution set}
\begin{example}[Multiplying from left can change the solution set]
Consider the matrix	
$$
\bX=\left[
\begin{matrix}
-3 & -4 \\
4 & 6  \\
1 & 1
\end{matrix}
\right] \mathrm{\,\,\,and\,\,\,}
\by=\left[
\begin{matrix}
1  \\
-1   \\
0
\end{matrix}
\right].
$$
It can be easily verified that $\bX\bbeta = \by$ has no solution for $\bbeta$ (an inconsistent system). However, if we multiply from left by 
$$
\bZ=
\begin{bmatrix}
0 & -1 & 6\\
0 & 1  & -4
\end{bmatrix}. 
$$
Then the new system $\bZ\bX\bbeta = \bZ\by$ has a solution: $\widehatbbeta = [1/2, -1/2]^\top$. 
This specific example shows why the normal equation can give rise to the least squares solution. Multiplying from the left of a linear system will change the solution set. The normal equation, especially, results in the least squares solution.
\end{example}

\section{OLS in Convex Optimization}\label{section:ls_conv}

\subsection*{Mathematical Tools}
We briefly introduce the concept of convex optimization.

\begin{definition}[Convex set]\label{definition:convexset}
A set $\sS\subseteq \real^n$ is called \textit{convex} if, for any $\bx,\by\in\sS$ and $\lambda\in[0,1]$, the point $\lambda\bx+(1-\lambda)\by$ also belongs to $\sS$.
\end{definition}
Geometrically, convex sets  contain all line segments
that join two points within the set (Figure~\ref{fig:cvxset}). Consequently, these sets do not feature any concave indentations.

\begin{figure}[h]
\centering       
\vspace{-0.25cm}                 
\subfigtopskip=2pt               
\subfigbottomskip=-2pt         
\subfigcapskip=-10pt      
\includegraphics[width=0.98\textwidth]{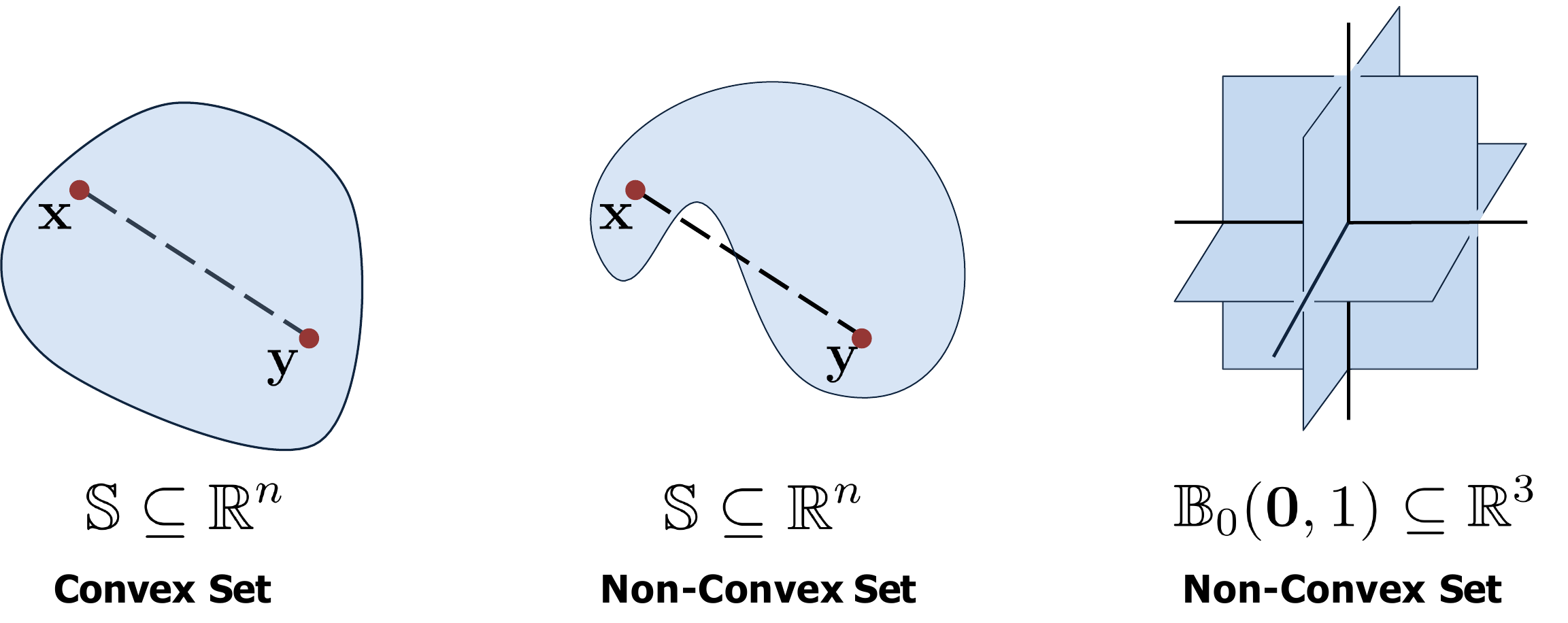}
\caption{A set is considered	 convex if it includes all convex combinations of its points. If there exists even one convex combination that lies outside the set, then by definition, the set is not convex. Therefore, a convex set must have a shape without any inward ``dents" or ``bulges". It's worth noting that the collection of sparse vectors does not satisfy this criterion and thus forms a non-convex set.}
\label{fig:cvxset}
\end{figure}

A related concept is that of convex functions, which exhibit specific behavior under convex combinations. We now recall the definition:
\begin{definition}[Convex functions]\label{definition:convexfuncs}
A function $f:\sS\rightarrow \real$ defined over a convex set $\sS\subseteq \real^n$ is called \textit{convex} if 
$$
f(\lambda\btheta+(1-\lambda)\bgamma)
\leq \lambda f(\btheta) + (1-\lambda) f(\bgamma), \text{ for any }\btheta,\bgamma\in\sS, \lambda\in[0,1].
$$
Moreover,   $f$ is called \textit{strictly convex} if 
$$
f(\lambda\btheta+(1-\lambda)\bgamma)
< \lambda f(\btheta) + (1-\lambda) f(\bgamma),\text{ for any }\btheta\neq \bgamma\in\sS, \lambda\in(0,1).
$$
\end{definition}
A well-known inequality derived from the concept of convex functions is provided below without a proof.
\begin{theoremHigh}[Jensen's inequality]\label{theorem:jensens_ineq}
Let $f: \sS \rightarrow \real$ be a convex function defined on a convex subset $\sS \subseteq \real^{n}$. For any finite sequence of points $\btheta_{1}, \btheta_{2}, \ldots, \btheta_{m} \in \sS$ and any sequence of nonnegative weights $\lambda_{1}, \lambda_{2},  \ldots, \lambda_{m}$ such that $\sum_{i=1}^{m} \lambda_{i} = 1$, Jensen's inequality states:
$$
f\left(\sum_{i=1}^{m} \lambda_{i} \btheta_{i}\right) \leq \sum_{i=1}^{m} \lambda_{i} f(\btheta_{i}).
$$
If $f$ is concave, the inequality is reversed.
In the context of probability theory, if $\rvx$ is a random vector with values in $\sS$ and $f$ is a convex function, Jensen's inequality can be stated as follows:
$$
f(\Exp[\rvx]) \leq \Exp[f(\rvx)],
$$
where $\Exp[\cdot]$ denotes the expectation operator over the random vector $\rvx$. For a concave function, the inequality is again reversed.
\end{theoremHigh}

Convex functions do not necessarily have to be differentiable. However, when they are  differentiable, such functions can be characterized by the gradient inequality.
\begin{theoremHigh}[Gradient Inequality]\label{theorem:conv_gradient_ineq}
Let $f:\sS\rightarrow \real$ be a {continuously differentiable} function defined on a convex set $\sS\subseteq\real^n$. Then, $f$ is convex over $\sS$ if and only if 
\begin{equation}\label{equation:conv_gradient_ineq1}
f(\btheta) +\nabla f(\btheta)^\top (\bgamma-\btheta)\leq f(\bgamma), \text{ for any $\btheta,\bgamma\in\sS$}.
\end{equation}
Similarly, the function is strictly convex over $\sS$ if and only if 
\begin{equation}\label{equation:conv_gradient_ineq2}
f(\btheta) +\nabla f(\btheta)^\top (\bgamma-\btheta)< f(\bgamma), \text{ for any $\btheta\neq \bgamma\in\sS$}.
\end{equation}
This indicates that the graph of a convex function lies above its tangent plane at any point. For concave or strictly concave functions, the inequality signs are reversed.
\end{theoremHigh}

\begin{exercise}[Convexity of quadratic functions]\label{exercise:conv_quad}
Let $f(\btheta)=\frac{1}{2}\btheta^\top\bA\btheta+\bb^\top\btheta+c$, where $\bA\in\real^{n\times n}$ is symmetric, $\bb\in\real^n$, and $c\in\real$. Show that $f(\btheta)$ is convex (resp. strict convex) if and only if $\bA\succeq \bzero$ (resp. $\bA\succ\bzero$).
\end{exercise}

A \textit{convex optimization problem} (or simply a convex problem) involves minimizing a convex function over a convex set:
\begin{equation}\label{equation:convex_optim1} 
\textbf{(Convex Optimization)}:\qquad
\begin{aligned}
&\min \quad f(\btheta) \quad &&\text{(convex function)}\\
&\text{s.t.}\quad \btheta\in\sS \quad &&\text{(convex set)}.
\end{aligned}
\end{equation}

\begin{theoremHigh}[Local is global in convex optimization]\label{theorem:local_glob_conv}
Let $f: \sS \rightarrow \real$ be a convex function (resp. strictly convex function) defined over the convex set $\sS$. 
If $\widehatbtheta \in \sS$ is a local minimum of $f$ over $\sS$, then $\widehatbtheta$ is the global minimum (resp. strict global minimum, i.e., the only global minimum point) of $f$ over $\sS$.
\end{theoremHigh}
\begin{proof}[of Theorem~\ref{theorem:local_glob_conv}]
Since $\widehatbtheta$ is a local minimum of $f$ over $\sS$, there exists a scalar $\tau > 0$ such that $f(\btheta) \geq f(\widehatbtheta)$ for any $\btheta \in \sS$ satisfying $\btheta \in \sB[\widehatbtheta, \tau]$. Now let $\bgamma \in \sS$ satisfy $\bgamma \neq \widehatbtheta$. It suffices to show that $f(\bgamma) \geq f(\widehatbtheta)$. Let $\lambda \in (0, 1]$ be such that $\widehatbtheta + \lambda(\bgamma - \widehatbtheta) \in \sB[\widehatbtheta, \tau]$. An example of such $\lambda$ is $\lambda = \frac{\tau}{\normtwo{\widehatbtheta - \bgamma}}$. Since $\widehatbtheta + \lambda(\bgamma - \widehatbtheta) \in \sB[\widehatbtheta, \tau] \cap \sS$, it follows that
$
f(\widehatbtheta) \leq f(\widehatbtheta + \lambda(\bgamma - \widehatbtheta)),
$
and hence by Jensen's inequality (Theorem~\ref{theorem:jensens_ineq})
$
f(\widehatbtheta) \leq f(\widehatbtheta + \lambda(\bgamma - \widehatbtheta)) \leq (1 - \lambda) f(\widehatbtheta) + \lambda f(\bgamma).
$
Therefore, we obtain $f(\widehatbtheta) \leq f(\bgamma)$.

A slight modification of the above argument shows that any local minimum of a strictly convex function over a convex set is indeed  a strict global minimum of the function over the set.
\end{proof}

The optimal set of the convex problem \eqref{equation:convex_optim1}  is the set of all minimizers, that is, $\Theta=\argmin\{f(\btheta) : \btheta \in \sS\}$. This definition of an optimal set is also valid for general problems. 
A notable property of convex problems is that their optimal sets are also convex.

\begin{theoremHigh}[Convexity of the optimal set in convex optimization]\label{theorem:stric_op_str_conv}
Let $f: \sS \rightarrow \real$ be a convex function defined over the convex set $\sS \subseteq \real^n$. Then the set of optimal solutions of the problem,
$
\Theta=\argmin\{f(\btheta) : \btheta \in \sS\},
$ 
is convex. If, in addition, $f$ is strictly convex over $\sS$, then there exists \textbf{at most one} optimal solution.
\end{theoremHigh}
\begin{proof}[of Theorem~\ref{theorem:stric_op_str_conv}]
If $\Theta = \varnothing$, the result follows trivially. 
We then assume that $\Theta \neq \varnothing$ and denote the optimal value by $f^*$. Let $\btheta, \bgamma \in \Theta$ and $\lambda \in [0, 1]$. Then, by Jensen's inequality $f(\lambda \btheta + (1 - \lambda) \bgamma) \leq \lambda f^* + (1 - \lambda) f^* = f^*$, and hence $\lambda \btheta + (1 - \lambda) \bgamma$ is also optimal, i.e., belongs to $\Theta$, establishing the convexity of $\Theta$. Suppose now that $f$ is strictly convex and $\Theta$ is nonempty; to show that $\Theta$ is a singleton, suppose in contradiction that there exist $\btheta, \bgamma \in \Theta$ such that $\btheta \neq \bgamma$. Then $\frac{1}{2}\btheta + \frac{1}{2}\bgamma \in \sS$, and by the strict convexity of $f$ we have
$$
f\left(\frac{1}{2}\btheta + \frac{1}{2}\bgamma\right) < \frac{1}{2}f(\btheta) + \frac{1}{2}f(\bgamma) = \frac{1}{2}f^* + \frac{1}{2}f^* = f^*,
$$
which leads to a contradiction to the fact that $f^*$ is the optimal value.
\end{proof}

Stationarity is a \textbf{necessary optimality condition} for local optimality (Proposition~\ref{proposition:fermat_fist_opt}). However, when the objective function is additionally assumed to be convex, stationarity is a \textbf{necessary and sufficient condition} for optimality.

\begin{theoremHigh}[Necessity/sufficiency  of constrained convex]\label{thm:conv_stationary_optimality}
Let $ f:\sS\subseteq\real^n\rightarrow\real $ be a continuously differentiable convex function over a closed and convex set $ \sS$. Then $ \widehatbtheta $ is a stationary point  of 
$$
\text{(P)} \qquad \min_{\btheta \in \sS} f(\btheta)
$$
if and only if $ \widehatbtheta $ is an optimal solution of (P).
\end{theoremHigh}
\begin{proof}[of Theorem~\ref{thm:conv_stationary_optimality}]
If $ \widehatbtheta $ is an optimal solution of (P), then by Proposition~\ref{proposition:fermat_fist_opt}, it follows that $ \widehatbtheta $ is a stationary point of (P). To prove the sufficiency of the stationarity condition, assume that $ \widehatbtheta $ is a stationary point  of (P). For any $ \btheta \in \sS $, we have:
$$
f(\btheta) \geq f(\widehatbtheta) + \nabla f(\widehatbtheta)^\top (\btheta - \widehatbtheta) \geq f(\widehatbtheta),
$$
where the first inequality follows from the gradient inequality for convex functions (Theorem~\ref{theorem:conv_gradient_ineq}), and the second inequality follows from the definition of a stationary point. This shows that $ \widehatbtheta $ is indeed the global minimum point of (P), completing the proof.
\end{proof}

\subsection*{LS in Convex Optimization}

The theorems on convex functions help answer Questions Q2 and Q3 posed at the beginning of Section~\ref{section:ls_big_pic} rigorously.
We previously briefly  answer the  question Q2, the uniqueness of the least squares solution, in the large-sample least squares problem. If $n>p=\rank(\bX)$, the least squares solution $\widehatbbeta=(\bX^\top\bX)^{-1}\bX^\top\by$ is unique. 
In fact, this is the \textbf{only} case in which the least squares solution is guaranteed to be unique.

Note that both Q2 and Q3 can be formulated as convex optimization problems:
\begin{equation}
\begin{aligned}
\text{(P2)}: \qquad &\min_{\bbeta\in\real^p} f(\bbeta) = \normtwo{\by-\bX\bbeta}=\bbeta^\top\bX^\top\bX\bbeta - 2\by^\top\bX\bbeta + \by^\top\by;\\
\text{(P3)}:\qquad &\min_{\bbeta\in\sB} g(\bbeta) = \normtwo{\bbeta}^2, \quad \sB \triangleq \{\bbeta\in \real^p\mid \normtwo{\by-\bX\bbeta}^2 =\min \}.
\end{aligned}
\end{equation}
Here, both $f(\bbeta)$ and $g(\bbeta)$ are  convex functions (Exercise~\ref{exercise:conv_quad}), and the set $\sB$ is convex  (see Theorem~\ref{theorem:stric_op_str_conv}). 
Therefore, both (P2) and (P3) are convex optimization problems.
Theorem~\ref{theorem:stric_op_str_conv} proves that if the function is strictly convex, then the solution is unique.
Apparently, $g(\bbeta)$ is strictly convex (this again confirms Theorem~\ref{theorem:rank_def_ls_prop} and Q3). 
And $f(\bbeta)$ is strictly convex only when $\bX^\top\bX$ is positive definite, which is the case only when $\bX$ has full column rank (i.e., $n>p=\rank(\bX)$).
This answers the question Q2 rigorously.

\section{OLS in Fundamental Theorem of Linear Algebra}\label{section:ls-fundation-theorem}\label{section:ls_fund_la}
\subsection*{Fundamental Theorem of Linear Algebra}\index{OLS by algebra}

For any matrix $\bX\in \real^{n\times p}$, it can be easily  verified that any vector in the row space of $\bX$ is perpendicular to any vector in the null space of $\bX$. Suppose $\bbeta_n \in \nspace(\bX)$, then $\bX\bbeta_n = \bzero$ such that $\bbeta_n$ is perpendicular to every row of $\bX$, supporting our claim. This implies the row space of $\bX$ is the orthogonal complement to the null space of $\bX$.

Similarly, we can also show that any vector in the column space of $\bX$ is perpendicular to any vector in the null space of $\bX^\top$. Furthermore, the column space of $\bX$ together with the null space of $\bX^\top$ span the entire space of $\real^n$ which is known as the fundamental theorem of linear algebra.
\begin{theoremHigh}[Fundamental theorem of linear algebra]\label{theorem:fundamental-linear-algebra}
Orthogonal Complement and Rank-Nullity Theorem: for any matrix $\bX\in \real^{n\times p}$, we have 
\begin{itemize}
\item  The null space $\nspace(\bX)$ is the orthogonal complement to the row space $\cspace(\bX^\top)$ in $\real^p$: $\dim(\nspace(\bX))+\dim(\cspace(\bX^\top))=p$;

\item The null space $\nspace(\bX^\top)$ is the orthogonal complement to the column space $\cspace(\bX)$ in $\real^n$: $\dim(\nspace(\bX^\top))+\dim(\cspace(\bX))=n$;

\item For rank-$r$ matrix $\bX$, $\dim(\cspace(\bX^\top)) = \dim(\cspace(\bX)) = r$, that is, $\dim(\nspace(\bX)) = p-r$ and $\dim(\nspace(\bX^\top))=n-r$.
\end{itemize}
\end{theoremHigh}

\begin{figure}[h]
\centering
\includegraphics[width=0.98\textwidth]{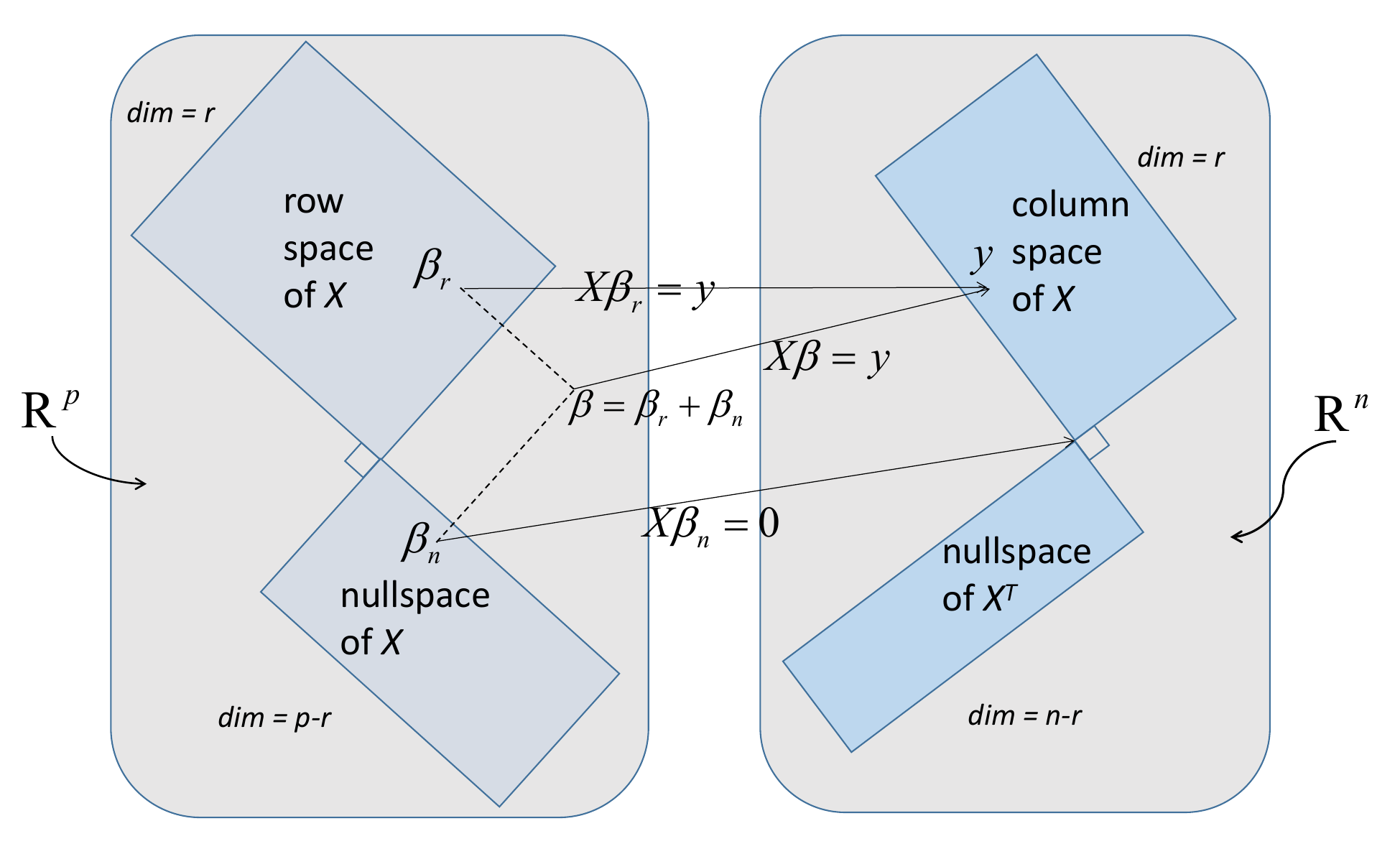}
\caption{Two pairs of orthogonal subspaces in $\real^p$ and $\real^n$. $\dim(\cspace(\bX^\top)) + \dim(\nspace(\bX)) = p$ and $\dim(\nspace(\bX^\top))+\dim(\cspace(\bX))=n$. The null space component goes to zero as $\bX\bbeta_{\bn} = \bzero \in \real^n$. 
Simultaneously, the row space component transforms into the column space by $\bX\bbeta_{\br} = \bX(\bbeta_{\br} + \bbeta_{\bn})=\by\in \cspace(\bX)$.}
\label{fig:lafundamental-ls}
\end{figure}
The fundamental theorem contains two parts, the dimension of the subspaces and the orthogonality of the subspaces. The orthogonality can be readily verified as we have shown at the beginning of this section. When the row space has dimension $r$, the null space has dimension $p-r$. This cannot be easily stated, and we prove it as follows.
\index{Fundamental theorem of linear algebra}
\begin{proof}[of Theorem~\ref{theorem:fundamental-linear-algebra}]
Following the proof of Lemma~\ref{lemma:equal-dimension-rank}, let $\br_1, \br_2, \ldots, \br_r$ be a set of vectors in $\real^p$ that form a basis for the row space. 
Then, \textcolor{black}{$\bX\br_1, \bX\br_2, \ldots, \bX\br_r$ is a basis for the column space of $\bX$}. Let $\bn_1, \bn_2, \ldots, \bn_k \in \real^p$ form a basis for the null space of $\bX$. Following again the proof of Lemma~\ref{lemma:equal-dimension-rank}, $\nspace(\bX) \bot \cspace(\bX^\top)$, thus, $\br_1, \br_2, \ldots, \br_r$ are perpendicular to $\bn_1, \bn_2, \ldots, \bn_k$. Then, $\{\br_1, \br_2, \ldots, \br_r, \bn_1, \bn_2, \ldots, \bn_k\}$ is linearly independent in $\real^p$.

For any vector $\bbeta\in \real^p $, $\bX\bbeta$ is in the column space of $\bX$. Thus, it can be expressed as a linear combination of $\bX\br_1, \bX\br_2, \ldots, \bX\br_r$: $\bX\bbeta = \sum_{i=1}^{r}a_i\bX\br_i$, which states that $\bX(\bbeta-\sum_{i=1}^{r}a_i\br_i) = \bzero$, and $\bbeta-\sum_{i=1}^{r}a_i\br_i$ is thus in $\nspace(\bX)$. Since $\{\bn_1, \bn_2, \ldots, \bn_k\}$ is a basis for the null space of $\bX$, $\bbeta-\sum_{i=1}^{r}a_i\br_i$ can be expressed by a combination of $\bn_1, \bn_2, \ldots, \bn_k$: $\bbeta-\sum_{i=1}^{r}a_i\br_i = \sum_{j=1}^{k}b_j \bn_j$, i.e., $\bbeta=\sum_{i=1}^{r}a_i\br_i + \sum_{j=1}^{k}b_j \bn_j$. That is, any vector $\bbeta\in \real^p$ can be expressed by $\{\br_1, \br_2, \ldots, \br_r, \bn_1, \bn_2, \ldots, \bn_k\}$ and the set forms a basis for $\real^p$. Thus the dimension add up to $p$: $r+k=p$, i.e., $\dim(\nspace(\bX))+\dim(\cspace(\bX^\top))=p$. Similarly, we can prove $\dim(\nspace(\bX^\top))+\dim(\cspace(\bX))=n$.
\end{proof}

Figure~\ref{fig:lafundamental-ls} demonstrates two pairs of such orthogonal subspaces and shows how $\bX$ takes $\bbeta$ into the column space. The dimensions of the row space of $\bX$ and the null space of $\bX$ add up to $p$. And the dimensions of the column space of $\bX$ and the null space of $\bX^\top$ add up to $n$. The null space component goes to zero as $\bX\bbeta_{\bn} = \bzero \in \real^n$, which is the intersection of the column space of $\bX$ and the null space of $\bX^\top$. The row space component transforms into the column space as $\bX\bbeta_{\br} = \bX(\bbeta_{\br} + \bbeta_{\bn})=\by\in \cspace(\bX)$.

\index{Fundamental theorem of linear algebra}
\subsection*{LS in Fundamental Theorem of Linear Algebra}

\begin{figure}[htp]
\centering
\includegraphics[width=0.98\textwidth]{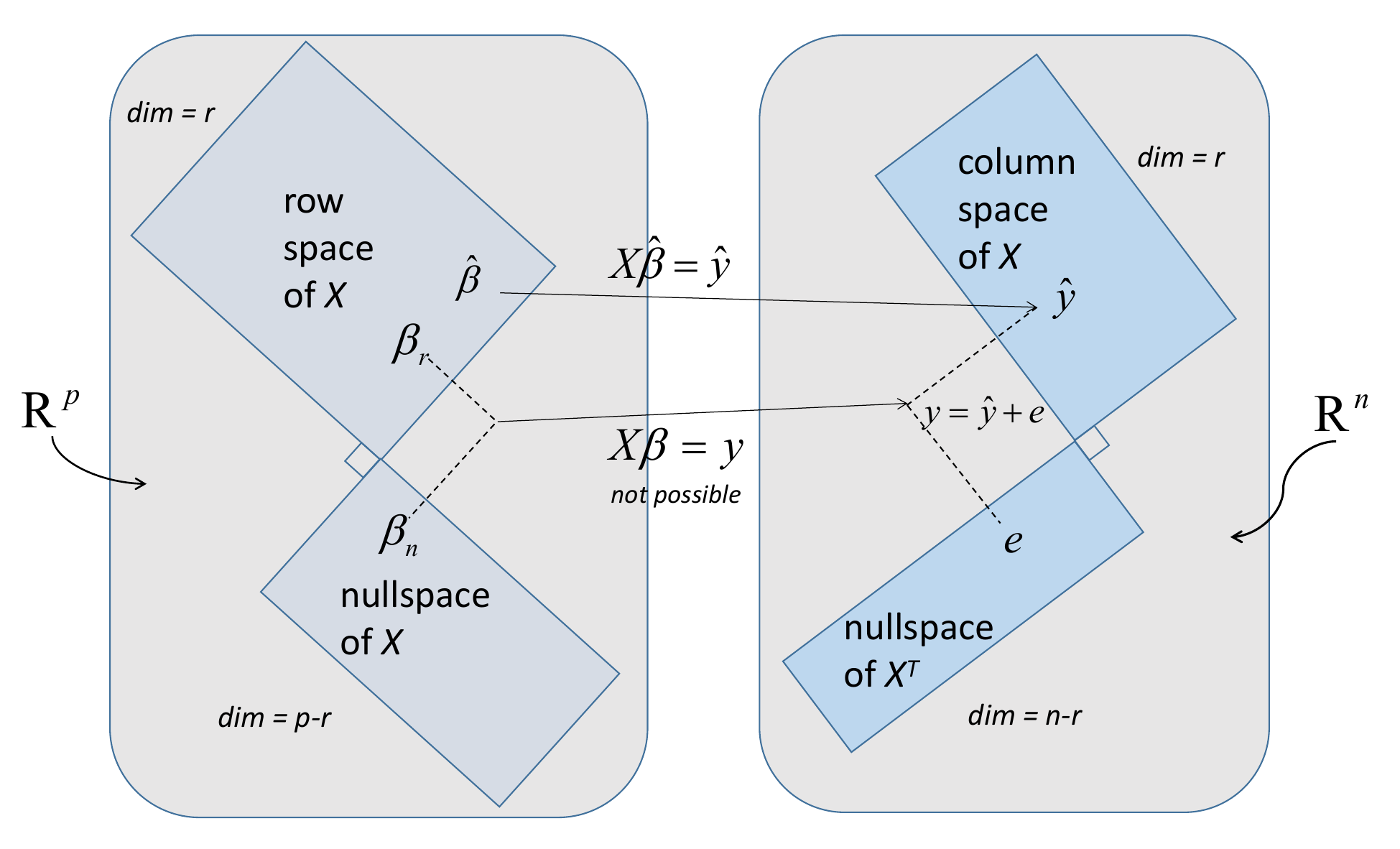}
\caption{Least squares: $\widehatbbeta$ minimizes $\normtwo{\by-\bX\bbeta}^2$. $\widehatbbeta$ is in the row space of $\bX$. $\be=\by-\bX\widehatbbeta$ is in the null space of $\bX^\top$.}
\label{fig:lafundamental2-LS}
\end{figure}

The solution to the least squares problem aims to minimize the error $\by-\bX\bbeta$ in terms of mean squared error.
Since $\bX\bbeta$ is a combination of the columns of $\bX$, it remains within the column space of $\bX$.  
Therefore,   the optimal choice is to select the nearest point to $\by$ within the column space \citep{strang1993fundamental, lu2021revisit}. This point is the projection $\widehat{\by}$ of $\by$ onto the column space of $\bX$. Then the error vector $\be=\by-\widehat{\by}$ has the minimum length. In other words, the best combination $\widehat{\by} = \bX\widehatbbeta$ is the projection of $\by$ onto the column space. The error $\be$ is perpendicular to the column space. Therefore, \textcolor{mydarkblue}{$\be=\by-\bX\widehatbbeta$ is in the null space of $\bX^\top$} (from the fundamental theorem of linear algebra):
$$
\bX^\top(\by-\bX\widehatbbeta) = \bzero  \qquad \text{or} \qquad \bX^\top\by=\bX^\top\bX\widehatbbeta,
$$
which agrees with the normal equation as we have defined in Section~\ref{section:ls_big_pic}. 
The relationship between $\be$ and $\widehat{\by}$ is shown in Figure~\ref{fig:lafundamental2-LS}, where $\by$ is decomposed into $\widehat{\by}+\be$. 
We can always find this decomposition since the column space of $\bX$ and the null space of $\bX^\top$ are orthogonal complement to each other, and they collectively span the entire space of $\real^n$.
Moreover, it can be demonstrated that the OLS estimate $\widehatbbeta$ resides in the row space of $\bX$, i.e., it cannot be decomposed into a combination of two components---one in the row space of $\bX$ and the other in the null space of $\bX$ (refer to the expression for  $\widehatbbeta$ via the pseudo-inverse of $\bX$ in Section~\ref{section:ls-via-svd}, where $\widehatbbeta$ is presented as a linear combination of the orthonormal basis of the row space; or refer to Theorem~\ref{theorem:unif_ls}).

To conclude, we avoid solving the equation $\by = \bX\bbeta$ by removing $\be$ from $\by$ and addressing $\widehat{\by} = \bX\widehatbbeta$ instead, i.e.,
\begin{equation}
\bX\bbeta=\by = \widehat{\by}+\be \,\, \mathrm{is\, impossible;} \qquad \bX\widehatbbeta=\widehat{\by}\,\, \mathrm{is\, possible.} \nonumber
\end{equation}

\index{Singular value decomposition}
\index{Full and reduced}
\section{OLS in  SVD for General Matrices}\label{section:ls-via-svd}

Prior to delving into the geometric aspects of least squares, we will first elucidate least squares through singular value decomposition (SVD), as they constitute fundamental components for the subsequent discussions.

\subsection{Least Squares via SVD for General Matrices}

Returning to the least squares problem, our prior assumption was that  $\bX$ has full rank. 
However, if $\bX$ does not have full column rank, $\bX^\top\bX$ becomes non-invertible. In such cases, we can employ the SVD decomposition of $\bX$ to address the least squares problem with a rank-deficient $\bX$.  
The methodology for solving the rank-deficient least squares problem is illustrated in the following theorem.

\index{Rank-deficient}
\index{Rank-deficiency}

\begin{theoremHigh}[LS via SVD for general matrices]\label{theorem:svd-deficient-rank}
Let $\bX\in \real^{n\times p}$, and let $\bX=\bU\bSigma\bV^\top$ be its full SVD decomposition with $\bU\in\real^{n\times n}$ and $\bV\in \real^{p\times p}$ being orthogonal matrices, and $\rank(\bX)=r \leq \min\{n,p\}$. Suppose $\bU=[\bu_1, \bu_2, \ldots, \bu_n]$ and  $\bV=[\bv_1, \bv_2, \ldots, \bv_p]$ are the column partitions of $\bU$ and $\bV$, respectively, and the observed output vector is $\by\in \real^n$. 
Then the ordinary least squares solution with the minimal $\ell_2$ norm to the linear system  $\bX\bbeta=\by$ is given by 
\begin{equation}\label{equation:svd-ls-solution}
\widehatbbeta = \sum_{i=1}^{r}\frac{\bu_i^\top \by}{\sigma_i}\bv_i = \bV\bSigma^+\bU^\top \by
\equiv \bX^+\by,
\end{equation}
where the upper-left side of $\bSigma^+ \in \real^{p\times n}$ is a diagonal matrix $\bSigma^+ = 
\footnotesize
\begin{bmatrix}
\bSigma_1^+ & \bzero \\
\bzero & \bzero
\end{bmatrix}$ with $\bSigma_1^+=\diag(\frac{1}{\sigma_1}, \frac{1}{\sigma_2}, \ldots, \frac{1}{\sigma_r})$, and $\bX^+$ denotes the pseudo-inverse of $\bX$ (Section~\ref{section:pseudo-in-svd}).
\end{theoremHigh}

\begin{proof}[of Theorem~\ref{theorem:svd-deficient-rank}]
Expressing the loss to be minimized:
$$
\begin{aligned}
\normtwo{\by-\bX\bbeta}^2 
&= (\by-\bX\bbeta)^\top(\by-\bX\bbeta)
\stackrel{\dag}{=} (\by-\bX\bbeta)^\top\bU\bU^\top (\by-\bX\bbeta) \\
&\stackrel{\ddag}{=} \normtwo{\bU^\top \bX \bbeta-\bU^\top\by}^2
=\normtwo{\bU^\top \bX \bV\bV^\top \bbeta-\bU^\top\by}^2 \\
&\stackrel{*}{=} \normtwo{\bSigma\balpha - \bU^\top\by}^2  
=\sum_{i=1}^{r}(\sigma_i\alpha_i - \bu_i^\top\by)^2 +\sum_{i=r+1}^{n}(\bu_i^\top \by)^2,  
\end{aligned}
$$
where the equality $(\dag)$ follows since $\bU$ is an orthogonal matrix, and the equality ($\ddag$) follows from the invariance under orthogonal transformations, the equality ($*$) follows by letting  $\balpha\triangleq \bV^\top \bbeta$, and the last equality follows since $\sigma_{r+1}=\sigma_{r+2}= \ldots= \sigma_n=0$.

Since $\bbeta$ only appears in $\balpha$, setting  $\alpha_i = \frac{\bu_i^\top\by}{\sigma_i}$ for all $i\in \{1, 2, \ldots, r\}$ minimizes the  loss above. 
The result remains the same for any values of $\alpha_{r+1}, \alpha_{r+2}, \ldots, \alpha_{p}$. 
From a regularization point of view, we can set them to be 0 (the same as searching for minimum norm of $\bbeta$). 
This yields the SVD-based OLS solution:
$$
\widehatbbeta = \sum_{i=1}^{r}\frac{\bu_i^\top \by}{\sigma_i}\bv_i=\bV\bSigma^+\bU^\top \by = \bX^+\by,
$$
where $\bX^+ = \bV\bSigma^+\bU^\top\in \real^{p\times n}$ is known as the \textit{pseudo-inverse} of $\bX$.  Refer to Section~\ref{section:pseudo-inverse}  for a detailed discussion about the pseudo-inverse, where we also prove that \textit{the column space of $\bX^+$ is equal to the row space of $\bX$, and the row space of $\bX^+$ is equal to the column space of $\bX$}.
\end{proof}

\begin{figure}[h!]
\centering
\includegraphics[width=0.98\textwidth]{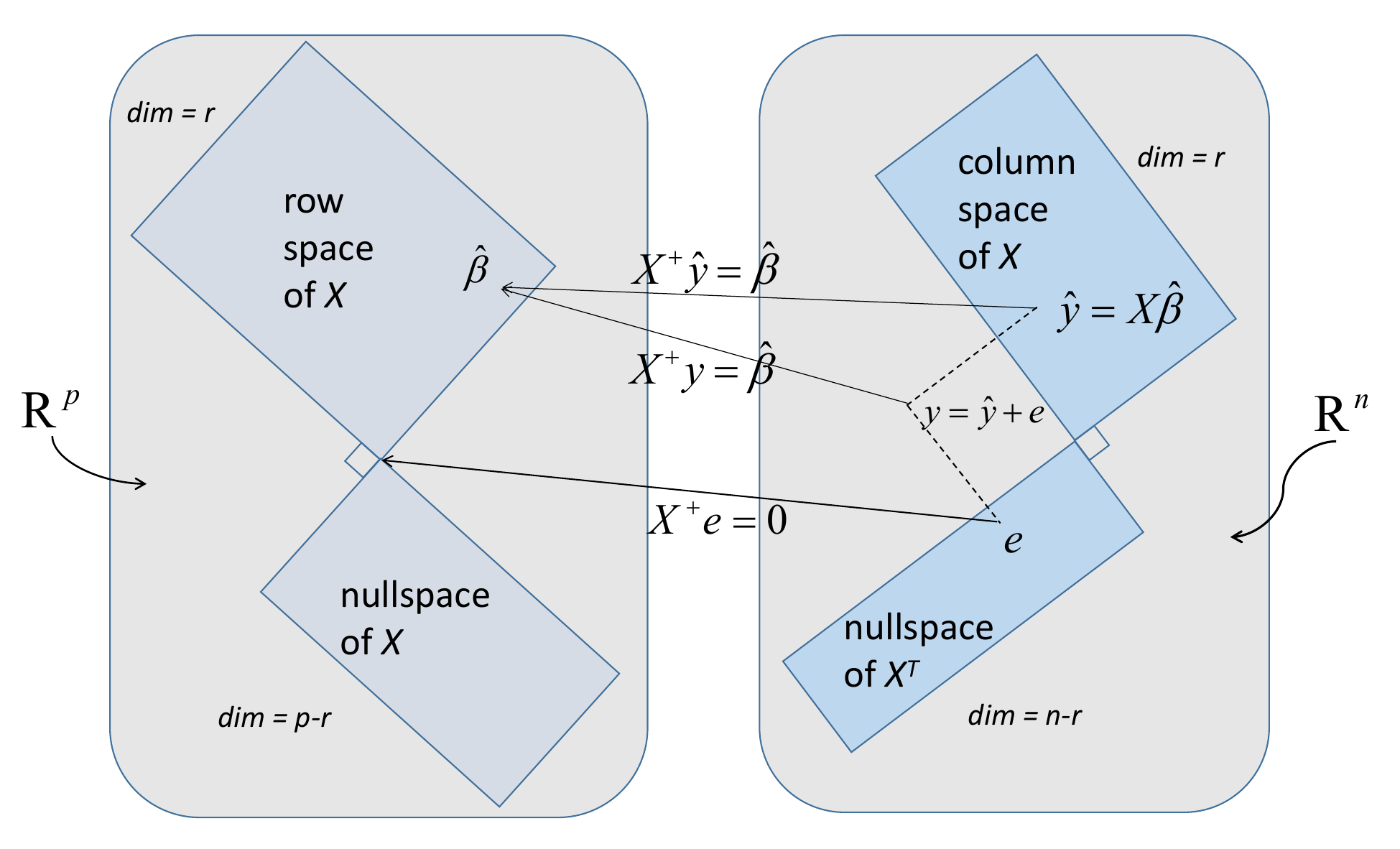}
\caption{$\bX^+$: Pseudo-inverse of $\bX$. A more detailed discussion of the four subspaces in pseudo-inverse is discussed in Section~\ref{section:subsec_pseudo_inv} (see  Figure~\ref{fig:lafundamental5-pseudo}).}
\label{fig:lafundamental4-LS-SVD}
\end{figure}
\index{Fundamental theorem of linear algebra}

\begin{proposition}[LS in the Four Subspaces of Linear Algebra via SVD]\label{proposition:fourspace-svd-ls}
Assume $\bX \in \real^{n\times p}$ is fixed and does \textbf{not} necessarily has full rank with $n\geq p$. Consider the overdetermined system $\by = \bX\bbeta$.
Then, we can factor $\by$ into $\widehat{\by} + \be$, where $\widehat{\by}$ is in the column space of $\bX$ (in $\cspace(\bX)$), and $\be$ is in the null space of $\bX^\top$ (in $\nspace(\bX^\top)$). We can always find this decomposition since the column space of $\bX$ and the null space of $\bX^\top$ span the entire space $\real^n$. The relationship between vectors $\be$ and $\widehat{\by}$ is shown in Figure~\ref{fig:lafundamental4-LS-SVD}. Let $\bX^+ = \bV\bSigma^+\bU^\top$ be the pseudo-inverse of $\bX$. The pseudo-inverse $\bX^+$ agrees with $\bX^{-1}$ when $\bX$ is invertible. Then, we have the following properties (also shown in Figure~\ref{fig:lafundamental4-LS-SVD}):
\begin{itemize}
\item  For $\be\in \nspace(\bX^\top)$, it follows that $\bX^+\be = \bzero \in \real^p$.
\item  Given the OLS solution $\widehatbbeta$ via SVD, it follows that $\bX^+\widehat{\by} = \bX^+\by = \widehatbbeta$.
\item  OLS solution $\widehatbbeta$ is in the row space of $\bX$, i.e., it cannot be decomposed into a combination of two components that are in the row space of $\bX$ and the null space of $\bX$, respectively. This is the reason why $\widehatbbeta$, as shown  in Figure~\ref{fig:lafundamental4-LS-SVD}, is in the row space of $\bX$ rather than in $\real^p$ in general.
\end{itemize}
\end{proposition}
\begin{proof}[of Proposition~\ref{proposition:fourspace-svd-ls}]
Since $\be$ is in $\nspace(\bX^\top)$ and it is perpendicular to $\cspace(\bX)$, and we have shown in Theorem~\ref{theorem:svd-four-orthonormal-Basis} that $\{\bu_1,\bu_2, \ldots,\bu_r\}$ is an orthonormal basis of $\cspace(\bX)$, then the first $r$ components of $\bU^\top\be$ are all zeros. Therefore, $\bX^+\be = \bV\bSigma^+\bU^\top\be=\bzero$ (see also Figure~\ref{fig:lafundamental4-LS-SVD} where we transfer $\be$ from $\nspace(\bX^\top)$ into the zero vector $\bzero \in \real^p$ by $\bX^+$). Therefore, it follows that $\widehatbbeta=\bX^+\by = \bX^+(\widehat{\by}+\be) = \bX^+\widehat{\by}$.

Furthermore, we have also shown in Theorem~\ref{theorem:svd-four-orthonormal-Basis} that $\{\bv_1, \bv_2, \ldots, \bv_r\} $ is an orthonormal basis of $\cspace(\bX^\top)$. Thus, $\widehatbbeta = \sum_{i=1}^{r}\frac{\bu_i^\top \by}{\sigma_i}\bv_i$  is in the row space of $\bX$.
\end{proof}

In the following sections, we will also demonstrate that the vector $\widehat{\by}$ is the closest point to $\by$ within the column space of $\bX$. This point is the (orthogonal) projection $\widehat{\by}$ of $\by$ onto the column space of $\bX$. 
Then the error vector $\be=\by-\widehat{\by}$ has the minimum length (norm). 

Besides the OLS solution derived from SVD, practical implementations of solutions through normal equation may encounter numerical challenges when $\bX^\top\bX$ is close to singular. In particular, when two or more  columns in $\bX^\top\bX$ are nearly co-linear, 
the resulting parameter values can become excessively large.
Such near degeneracies will not be uncommon when dealing with real-world data sets. 
Addressing these numerical challenges can be effectively achieved through the application of SVD as well  \citep{bishop2006pattern}.

\index{Truncated SVD}
\index{Norm ratio}
\subsection{Least Squares with Norm Ratio Method}\label{section:svd_nmratio_method}

Continuing from the previous section, let $\bX_k \in \real^{n\times p}$ be the optimal rank-$k$ approximation to the original $n\times p$ matrix $\bX$ (Theorem~\ref{theorem:young-theorem_frob}). Define the \textit{Frobenius norm ratio} \citep{zhang2017matrix} as 
$$
\nu(k) \triangleq \frac{\normf{\bX_k}}{\normf{\bX}} = \frac{\sqrt{\sigma_1^2+\sigma_2^2+\ldots +\sigma_k^2}}{\sqrt{\sigma_1^2+\sigma_2^2+\ldots +\sigma_h^2}}, \quad h = \min\{n,p\},
$$
where  
$\bX_k$ is the truncated SVD of $\bX$ with the largest $k$ terms, i.e., $\bX_k = \sum_{i=1}^{k} \sigma_i\bu_i\bv_i^\top$ from the SVD of $\bX=\sum_{i=1}^{r} \sigma_i\bu_i\bv_i^\top$. And $\normf{\cdot}$ is the matrix Frobenius norm (Definition~\ref{definition:frobernius-in-svd}).
We determine the minimum integer $k$ satisfying 
$$
\nu(k) \geq \alpha
$$
as the \textit{effective rank estimate} $\widehat{r}$, where $\alpha$  is the threshold   capped at a maximum value of 1, and it is usually set to $\alpha=0.997$.
Once we have determined the effective rank $\widehat{r}$, we substitute it into Equation~\eqref{equation:svd-ls-solution}, yielding:
$$
\widehatbbeta = \sum_{i=1}^{\textcolor{mylightbluetext}{\widehat{r}}}\frac{\bu_i^\top \bb}{\sigma_i}\bv_i , 
$$
which can be regarded as an approximation to the OLS solution $\widehatbbeta$. 
And this solution corresponds to the OLS solution of the linear equation $\bX_{\widehat{r}}\bbeta = \bb$, where 
$$
\bX_{\widehat{r}} = \sum_{i=1}^{\widehat{r}} \sigma_i \bu_i\bv_i^\top.
$$
The introduced  filtering method  is particularly valuable  when dealing with a noisy matrix $\bX$ \citep{zhang2017matrix}.

\index{High-dimensional LS}
\index{Minimum-norm solution}
\subsection{High-Dimensional Least Squares Problems}

Although Theorem~\ref{theorem:svd-deficient-rank} applies to general types of matrices $\bX$,regardless of whether they have a large sample size or are rank-deficient,
we now focus on the problem of finding a solution to the linear system $\bX\bbeta = \by$
 where $\bX \in \real^{n \times p}$ has full row rank and $p > n$, i.e., the \textit{high-dimensional least squares problem}. Although there is no unique solution in general, the minimum ($\ell_2$) norm solution is unique. 
 The minimum-norm solution is defined as follows:
$$
\widehatbbeta_{\text{mn}} = \argmin_{\bbeta} \normtwo{\bbeta}^2\quad \text{s.t. }\bX\bbeta=\by.
$$
The solution to the minimum-norm problem is
$$
\widehatbbeta_{\text{mn}} = \bX^+\by= \bX^\top (\bX\bX^\top)^{-1} \by
$$
The above matrix $\bX^\top (\bX\bX^\top)^{-1}$ exists as long as $\bX$ has full row rank. To show that $\widehatbbeta_{\text{mn}}$ is a valid solution, we can substitute $\widehatbbeta_{\text{mn}}$ into the constraint equation:
$ \bX\widehatbbeta_{\text{mn}} = \bX\bX^\top (\bX\bX^\top)^{-1} \by = \by. $
To prove that $\widehatbbeta_{\text{mn}}$ has the smallest ($\ell_2$) norm among all solutions of $\bX\bbeta = \by$, we first show that the difference vector $(\bbeta - \widehatbbeta_{\text{mn}})$ is orthogonal to $\widehatbbeta_{\text{mn}}$. For any $\bbeta' \in \real^p$ such that $\bX\bbeta' = \by$, it follows that
$$
\begin{aligned}
(\widehatbbeta_{\text{mn}} - \bbeta')^\top \widehatbbeta_{\text{mn}} &= (\widehatbbeta_{\text{mn}} - \bbeta')^\top \bX^\top (\bX\bX^\top)^{-1} \by \\
&= (\bX(\widehatbbeta_{\text{mn}} - \bbeta'))^\top (\bX\bX^\top)^{-1} \by \\
&= (\by - \by)^\top (\bX\bX^\top)^{-1} \by
= \bzero,
\end{aligned}
$$
whence we have 
$$
\begin{aligned}
\normtwo{\bbeta'}^2 
&= \normtwobig{\bbeta' - \widehatbbeta_{\text{mn}} + \widehatbbeta_{\text{mn}}}^2 
= \normtwobig{\bbeta' - \widehatbbeta_{\text{mn}}}^2 + \normtwobig{\widehatbbeta_{\text{mn}}}^2
\geq \normtwobig{\widehatbbeta_{\text{mn}}}^2.
\end{aligned}
$$
This proves  that $\widehatbbeta_{\text{mn}} = \bX^\top (\bX\bX^\top)^{-1} \by$ is a solution to the minimum-norm problem.
Recall that any matrix $\bX$ can be written as the full SVD:
$ \bX = \bU \bSigma \bV^\top, $
where $\bU \in \real^{n \times n}$, $\bSigma \in \real^{n \times p}$, and $\bV \in \real^{p \times p}$. 
Since $\bX$ has full row rank. Substituting the SVD into the expression for $\widehatbbeta_{\text{mn}}$, we obtain:
$$
\begin{aligned}
\widehatbbeta_{\text{mn}} 
&= \bX^\top (\bX\bX^\top)^{-1} \by
= \bV \bSigma \bU^\top (\bU \bSigma \bV^\top \bV \bSigma \bU^\top)^{-1} \by \\
&= \bV \bSigma \bU^\top (\bU \bSigma^2 \bU^\top)^{-1} \by
= \bV \bSigma  ( \bSigma^2 )^{-1} \bU^\top \by
 = \bX^+\by,
\end{aligned}
$$
which again agrees with Theorem~\ref{theorem:svd-deficient-rank}.

\index{Orthogonal projection}
\index{Geometry interpretation}
\section{OLS in Geometry and Orthogonal Projection}\label{section:by-geometry-hat-matrix}
As discussed earlier, the OLS estimate involves minimizing $\normtwo{\by-\bX\bbeta}^2$, which searches for an estimate $\widehatbbeta$ such that $\bX\widehatbbeta$ is in $\cspace(\bX)$ so as to minimize the distance between $\bX\widehatbbeta$ and $\by$. The nearest point is the \textit{projection} $\widehat{\by}$. The predicted value $\widehat{\by} = \bX\widehatbbeta$ is the projection of $\by$ onto the column space  $\cspace(\bX)$ by a \textit{projection matrix} ${\bH \triangleq \bX(\bX^\top\bX)^{-1}\bX^\top}$ when $\bX$ has full column rank with $n\geq p$:
\begin{equation}
\widehat{\by} = \bX\widehatbbeta = \bH\by, \nonumber
\end{equation}
where the matrix $\bH$ is also known as the \textit{hat matrix}, since it ``put a hat" on $\by$ to produce $\widehatby$.
 
This shows $h_{ij}$, the entry $(i,j)$ of $\bH$ measures the influence or statistical leverage exerted on the prediction $\widehaty_i$ by the observation $y_j$.
Relatedly, if the $i$-th diagonal element of $\bH$ is particularly large, then the $i$-th data point ($i$-th row of $\bX$) is particularly sensitive or influential in determining the best LS fit, thus justifying the interpretation of the elements $h_{ii}$ as \textit{statistical leverage scores}~\footnote{These statistical leverage scores can be calculated using any semi-orthogonal matrix spanning the column space of $\bX$; see Problem~\ref{prob:statis_lev_semior}.} \citep{mahoney2011randomized}.
These leverage scores have been used extensively in classical regression diagnostics to identify potential outliers by, e.g., flagging data points with leverage score greater than 2 or 3 times the average value in order to be investigated as errors or potential outliers \citep{chatterjee1988sensitivity}.

But what is a projection matrix? Merely stating that $\bH = \bX(\bX^\top\bX)^{-1}\bX^\top$ is a projection requires elucidation. 
Before the discussion on the projection matrix, we first provide some basic properties about symmetric and idempotent matrices, which will find extensive application in subsequent sections.

\subsection{Properties of Symmetric and Idempotent Matrices}\index{Idempotent matrix}

Symmetric idempotent matrices exhibit specific eigenvalues, a crucial aspect for the subsequent sections on the distribution theory of least squares.

\begin{lemma}[Eigenvalue of symmetric idempotent matrices]\label{proposition:eigenvalues-of-projection}
The only possible eigenvalues of any symmetric idempotent matrix are 0 and 1.
\end{lemma}
In Lemma~\ref{proposition:eigenvalues-of-projection2}, we will show that the eigenvalues of idempotent matrices (not necessarily symmetric) are 1 and 0 as well, which relaxes the conditions required here (both idempotent and symmetric). However, the method used in the proof is quite useful so we keep both of the claims.
\begin{proof}[of Lemma~\ref{proposition:eigenvalues-of-projection}]
Let $\bX$ be a symmetric idempotent matrix. By spectral theorem (Theorem~\ref{theorem:spectral_theorem}), we can decompose $\bX = \bQ \bLambda\bQ^\top$, where $\bQ$ is an orthogonal matrix, and $\bLambda$ is a diagonal matrix. Therefore, it follows that 
\begin{equation}
\begin{aligned}
(\bQ\bLambda\bQ^\top)^2 
&= \bQ\bLambda\bQ^\top 
\qquad\implies \qquad
\bQ\bLambda\bQ^\top\bQ\bLambda\bQ^\top = \bQ\bLambda\bQ^\top\\
\qquad\implies\qquad
\bQ\bLambda^2\bQ^\top &= \bQ\bLambda\bQ^\top 
\qquad\implies \qquad
\bLambda^2 = \bLambda
\qquad\implies \qquad
\lambda_i^2 = \lambda_i,
\end{aligned}
\end{equation}
where the first equality follows since $\bX$ is symmetric and idempotent.
Thus, the eigenvalues of $\bX$ satisfy that $\lambda_i \in \{0,1\},\ \forall\, i$. This completes the proof.
\end{proof}

In the previous lemma, we used the spectral theorem to show that the eigenvalues of any symmetric idempotent matrix are 0 or 1. This approach is common in linear algebra and appears frequently in statistical theory (see later sections on distribution theory; Chapter~\ref{sec:lr-gaussian-noise}). With a slight modification, we can remove the symmetry condition entirely and extend the result to general idempotent matrices.
\begin{lemma}[Eigenvalue of idempotent matrices]\label{proposition:eigenvalues-of-projection2}
The only possible eigenvalues of any idempotent matrix are 0 and 1.
\end{lemma}
\begin{proof}[of Lemma~\ref{proposition:eigenvalues-of-projection2}]
Let $\bbeta$ denote an eigenvector of the idempotent matrix $\bX$ corresponding to the eigenvalue $\lambda$. That is,
$\bX \bbeta = \lambda \bbeta$.
Also, we have 
$$
\begin{aligned}
\bX^2\bbeta 
&=(\bX^2)\bbeta =\bX\bbeta = \lambda\bbeta
=\bX(\bX\bbeta) = \bX(\lambda\bbeta)=\lambda\bX\bbeta=\lambda^2\bbeta,
\end{aligned}
$$
which implies $\lambda^2 = \lambda$, and $\lambda$ is either 0 or 1. This completes the proof.
\end{proof}

We also demonstrate that the rank of a symmetric idempotent matrix is equal to its trace, a result that will be highly beneficial in the subsequent sections.
\begin{lemma}[Rank and trace of symmetric idempotent matrices]\label{lemma:rank-of-symmetric-idempotent}
For any $n\times n$ symmetric idempotent matrix $\bX$, the rank of $\bX$ equals its trace.
\end{lemma}
\begin{proof}[of Lemma~\ref{lemma:rank-of-symmetric-idempotent}]
From Spectral Theorem~\ref{theorem:spectral_theorem}, the matrix $\bX$ admits the spectral decomposition $\bX = \bQ\bLambda\bQ^\top$. 
Since $\bX$ and $\bLambda$ are similar matrices, their rank and trace are the same (see Lemma~\ref{lemma:eigenvalue-similar-matrices}). That is, 
$$
\begin{aligned}
\rank(\bX) &= \rank(\diag(\lambda_1, \lambda_2, \ldots, \lambda_n));\\
\trace(\bX) &= \trace(\diag(\lambda_1, \lambda_2, \ldots, \lambda_n)),\\
\end{aligned}
$$
By Lemma~\ref{proposition:eigenvalues-of-projection}, the only eigenvalues of $\bX$ are 0 and 1. Then, it follows that 
$\rank(\bX) = \trace(\bX)$.
\end{proof}

\index{Reduced row echelon form}
\index{Idempotent matrix}
In the  previous lemma, we prove the rank and trace of any symmetric idempotent matrix are the same. 
However, this result also holds under a weaker condition---namely, just idempotency. We now present a more general version of the lemma. Although the second proof applies to a broader class of matrices, we again include both versions because the techniques used in each are valuable and commonly applied in linear algebra.
\begin{lemma}[Rank and trace of an idempotent matrix]\label{lemma:rank-of-symmetric-idempotent2}
For any $n\times n$ idempotent matrix $\bX$, the rank of $\bX$ equals  its trace.
\end{lemma}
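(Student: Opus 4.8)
The plan is to mirror the symmetric case but to replace the spectral decomposition—which is unavailable without symmetry—by a similarity reduction built directly from the geometry of an idempotent map. Specifically, I would show that $\bA$ is similar to the block matrix $\begin{bmatrix} \bI_r & \bzero \\ \bzero & \bzero \end{bmatrix}$ with $r = rank(\bA)$, and then invoke the similarity invariance of rank and trace (Appendix~\ref{appendix:similar-matrices}) exactly as in the proof of Lemma~\ref{lemma:rank-of-symmetric-idempotent}. Since the trace of this block matrix is plainly $r$, the conclusion $trace(\bA)=rank(\bA)$ would follow immediately.

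The heart of the argument is the direct sum decomposition $\real^n = \cspace(\bA)\oplus\nspace(\bA)$. First I would check that the sum spans: for any $\bx\in\real^n$, write $\bx = \bA\bx + (\bx-\bA\bx)$, where $\bA\bx\in\cspace(\bA)$ and, using $\bA^2=\bA$, the second piece satisfies $\bA(\bx-\bA\bx) = \bA\bx-\bA^2\bx = \bzero$, so it lies in $\nspace(\bA)$. Next I would check that the intersection is trivial: if $\bx\in\cspace(\bA)\cap\nspace(\bA)$, then $\bx=\bA\by$ for some $\by$ and $\bA\bx=\bzero$, whence $\bx = \bA\by = \bA^2\by = \bA(\bA\by)=\bA\bx = \bzero$. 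This gives $dim(\cspace(\bA)) + dim(\nspace(\bA)) = n$, consistent with rank-nullity.

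The crucial observation—and the step I expect to carry the real content—is that $\bA$ acts as the identity on its own column space: if $\bx = \bA\by\in\cspace(\bA)$, then $\bA\bx = \bA^2\by = \bA\by = \bx$. Combined with $\bA$ vanishing on $\nspace(\bA)$ by definition, I would assemble a basis of $\real^n$ by concatenating a basis $\{\bv_1,\ldots,\bv_r\}$ of $\cspace(\bA)$ with a basis $\{\bv_{r+1},\ldots,\bv_n\}$ of $\nspace(\bA)$; the direct sum guarantees this is indeed a basis. Collecting these as the columns of an invertible matrix $\bP$, the action just described yields $\bA\bP = \bP\begin{bmatrix} \bI_r & \bzero \\ \bzero & \bzero \end{bmatrix}$, i.e., $\bP^{-1}\bA\bP = \begin{bmatrix} \bI_r & \bzero \\ \bzero & \bzero \end{bmatrix}$. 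Then $rank(\bA)=r$ and $trace(\bA)=r$ by similarity invariance, completing the proof. The only delicate point is verifying the direct sum and the identity action on $\cspace(\bA)$; the assembly into $\bP$ and the final trace computation are routine.
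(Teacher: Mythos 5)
Your proof is correct, but it takes a genuinely different route from the paper's. The paper factors $\bA=\bC\bR$ via the CR (rank) decomposition with $\bC\in\real^{n\times r}$ and $\bR\in\real^{r\times n}$ both of full rank $r$, cancels the full-rank factors in $\bC\bR\bC\bR=\bC\bR$ to obtain $\bR\bC=\bI_r$, and then concludes $trace(\bA)=trace(\bC\bR)=trace(\bR\bC)=r$ by cyclic invariance of the trace --- a purely algebraic three-line argument. You instead establish the direct sum $\real^n=\cspace(\bA)\oplus\nspace(\bA)$, observe that $\bA$ acts as the identity on $\cspace(\bA)$ and as zero on $\nspace(\bA)$, and deduce the similarity
$$
\bP^{-1}\bA\bP=\begin{bmatrix}\bI_r & \bzero\\ \bzero & \bzero\end{bmatrix},
$$
finishing with the similarity invariance of rank and trace from Appendix~\ref{appendix:similar-matrices}. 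All of your intermediate verifications (spanning, trivial intersection, identity action on the column space) are sound, and the assembly of $\bP$ is routine as you say. What your route buys is more structural information: it exhibits the canonical form of an idempotent under similarity, shows in passing that every idempotent is diagonalizable with eigenvalues $0$ and $1$ (strengthening Lemma~\ref{proposition:eigenvalues-of-projection2}), and directly parallels the spectral-decomposition proof of the symmetric case in Lemma~\ref{lemma:rank-of-symmetric-idempotent}. What the paper's route buys is brevity and independence from any basis construction --- it needs only the existence of a full-rank factorization and the cyclic property of the trace.
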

\begin{proof}[of Lemma~\ref{lemma:rank-of-symmetric-idempotent2}]
Any $n\times n$ rank-$r$ matrix $\bX$ admits CR decomposition $\bX = \bC\bR$, where $\bC\in\real^{n\times r}$ and $\bR\in \real^{r\times n}$ have full rank $r$ 
(see Section~\ref{section:cr-decomposition}).
Then, it follows that 
$$
\begin{aligned}
\bX^2 &= \bX 
\quad\implies \quad
\bC\bR\bC\bR = \bC\bR 
\quad\implies \quad
\bR\bC\bR =\bR 
\quad\implies \quad
\bR\bC =\bI_r,
\end{aligned}
$$ 
where $\bI_r$ is the $r\times r$ identity matrix. Thus, the trace is
$$
\trace(\bX) = \trace(\bC\bR) =\trace(\bR\bC) =\trace(\bI_r) = r, 
$$
where the second equality uses the cyclic property of the trace, which completes the proof.
\end{proof}

\index{Projection matrix}
\subsection{By Geometry and Orthogonal Projection}\label{section:ortho_geom_ls}

Formally, we define the {projection matrix} as follows:
\begin{definition}[Projection matrix]\label{definition:projection-matrix}
A matrix $\bH\in \real^{n\times n}$ is called a \textit{projection matrix} or \textit{projector} onto a subspace $\mathcalV \in \real^n$ if and only if $\bH$ satisfies the following properties:
\begin{itemize}
\item (P1). $\bH\by \in \mathcalV$ for all $\by \in \real^n$: Any vector can be projected onto the subspace $\mathcalV$.
\item (P2). $\bH\by =\by$ for all $\by \in \mathcalV$: Projecting a vector that is already in that subspace has no further effect.
\item (P3). $\bH^2 = \bH$, i.e., applying the projection twice is the same as applying it once, because the vector is already in the subspace. This property is known as idempotence.
\end{itemize}
\end{definition}

Since we project a vector in $\real^n$ onto a subspace of $\real^n$, any projection matrix must be square. 
Otherwise, we will project onto the subspace of $\real^m$ rather than $\real^n$.
We realize that $\bH\by$ is always in the column space of $\bH$. 
One might then ask: what is the relationship between the subspace $\mathcalV$ and the column space $\cspace(\bH)$. 
In fact, the column space of $\bH$ is equal to the subspace $\mathcalV$ onto which we are projecting. 

Suppose $\mathcalV=\cspace(\bH)$, and suppose further that $\by$ is already in the subspace $\mathcalV=\cspace(\bH)$, i.e., there is a vector $\balpha$ such that $\by=\bH\balpha$. Given  only the condition (P3) above, we have,
$$
\bH\by = \bH\bH\balpha = \bH\balpha = \by.
$$
That is, condition (P3) implies conditions (P1) and (P2). 
Therefore, the definition of a projection matrix can be simplified to require only that $\bH$ is idempotent.

Intuitively, we also want the projection $\widehat{\by}=\bH\by$ of any vector $\by$ to be perpendicular to the residual vector $\by - \widehat{\by}$ such that the distance between $\widehat{\by}$ and $\by$ is minimized, which aligns with the principle of least squares error minimization. Such a projection is called an \textit{orthogonal projection}.

\index{Orthogonal projection}
\begin{definition}[Orthogonal and oblique projection matrix]\label{definition:orthogonal-projection-matrix}
A matrix $\bH$ is called an \textit{orthogonal projection matrix} or an \textit{orthogonal projector}  onto a subspace $\mathcalV \in \real^n$ if and only if $\bH$ is a projection matrix, and the projection $\widehat{\by}$ of any vector $\by\in \real^n$ is orthogonal to $\by - \widehat{\by}$, i.e., $\bH$ projects onto $\mathcalV$ and along $\mathcalV^\perp$, the orthogonal complement of $\mathcalV$.

Otherwise, if $\widehat{\by}$ is not orthogonal to $\by - \widehat{\by}$, then the projection matrix is called an \textit{oblique projection matrix} or an \textit{oblique projector}. A comparison between orthogonal and oblique projections is shown in Figure~\ref{fig:ls-geometric1-compare}.
\end{definition}

\begin{figure}[h!]
\centering  
\vspace{-0.35cm} 
\subfigtopskip=2pt  
\subfigbottomskip=2pt  
\subfigcapskip=-5pt  
\subfigure[Orthogonal projection: project $\by$ to $\widehat{\by}$.]{\label{fig:ls-geometric1}
\includegraphics[width=0.47\linewidth]{./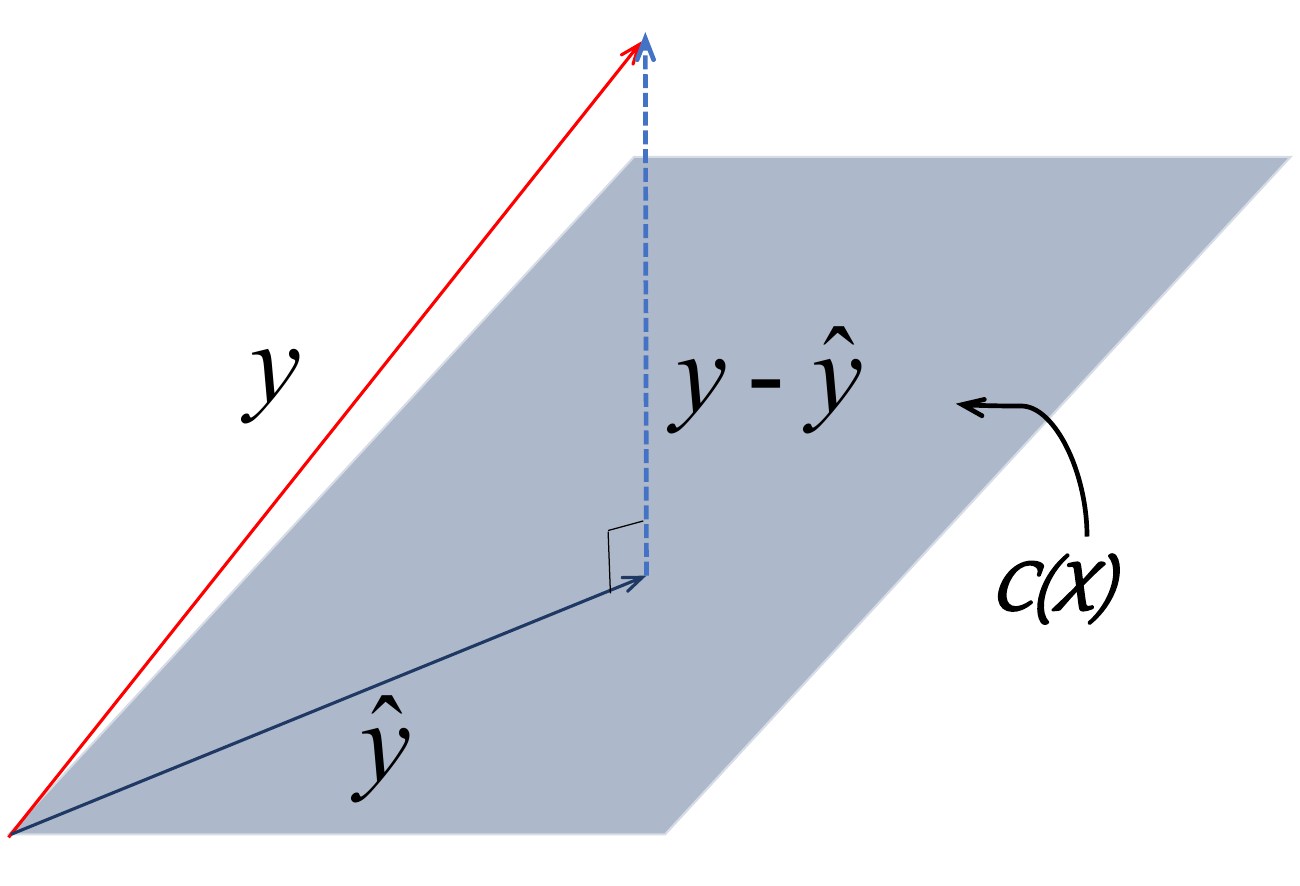}}
\quad 
\subfigure[Oblique projection: project $\by$ to $\widehat{\by}_1$ or $\widehat{\by}_2$.]{\label{fig:ls-geometric1-oblique}
\includegraphics[width=0.47\linewidth]{./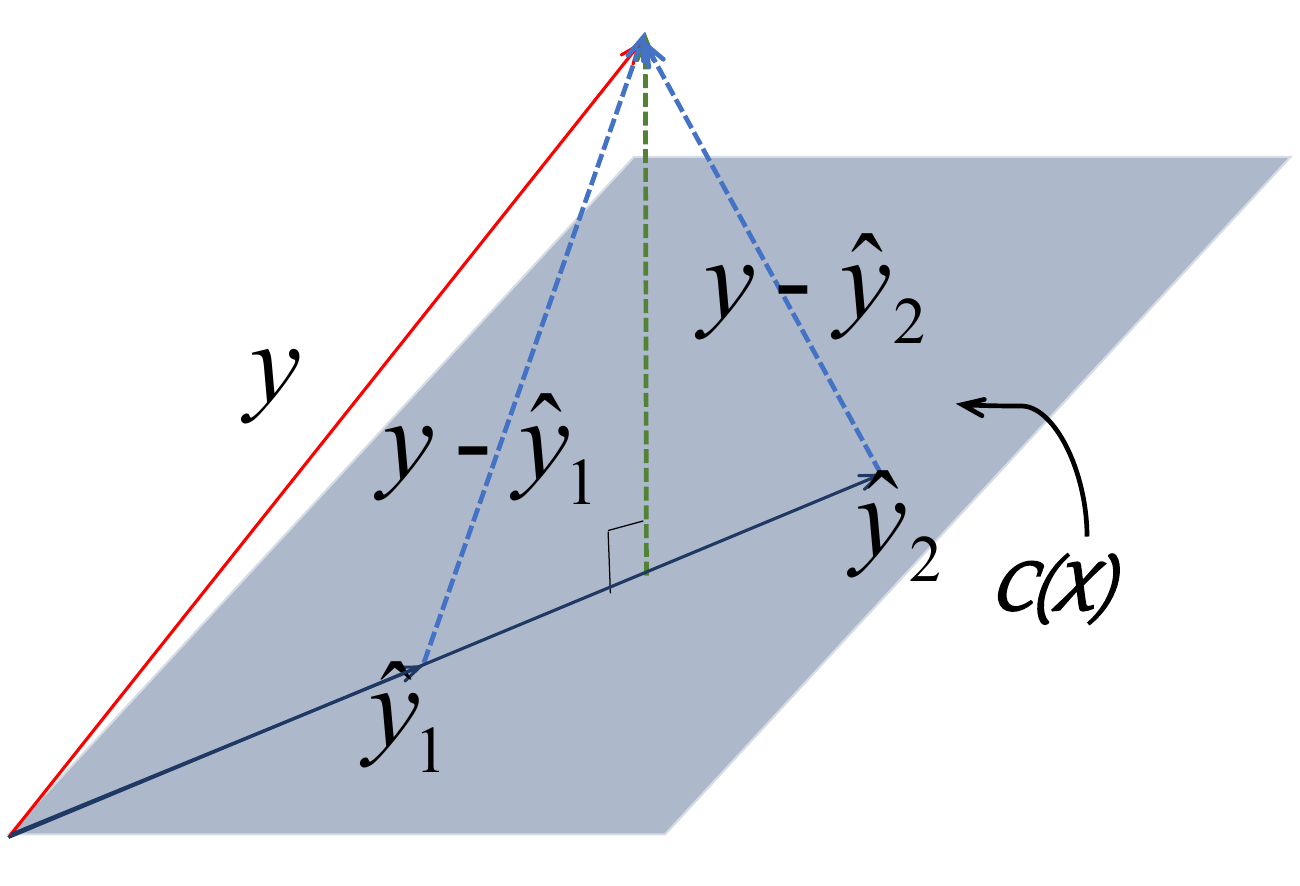}}
\caption{Projection onto the hyperplane of $\cspace(\bX)$, i.e., the column space of $\bX$.}
\label{fig:ls-geometric1-compare}
\end{figure}

Note that in the context of orthogonal projection, the term does not imply that the projection matrix itself is orthogonal (Definition~\ref{definition:orthogn_mat}). 
Instead, it means that the projected vector $\widehat{\by}$ is perpendicular to the residual vector $\by - \widehat{\by}$. This specialized orthogonal projection matrix will be implicitly assumed as such in the subsequent discussion unless explicitly clarified.

An \textit{elementary projector} is a projector exactly one of whose eigenvalues is 0.
Then we have the following result:
\begin{exercise}[Elementary projection matrix]\label{exercise:projection_matrix_intro2}
Let $\bx\in\real^n$ be nonzero. Show that $\bH\triangleq \bI-\frac{1}{\bx^\top\bx}\bx\bx^\top$ is an elementary projection matrix satisfying (a). $\rank(\bH)=n-1$;  (b). $\cspace(\bH)=\spn\{\bx\}^\perp$; (c). $\nspace(\bH)=\spn\{\bx\}$.
For the other way around, if $\bH\in\real^{n\times n}$ is a projector with $\rank(\bH)=n-1$, show that there is a nonzero vector $\bx\in\nspace(\bH)$ such that $\bH= \bI-\frac{1}{\bx^\top\bx}\bx\bx^\top$.
\end{exercise}

\begin{lemma}[Symmetric orthogonal projection matrix]\label{lemma:symmetric-projection-matrix}
A projection matrix $\bH$ is an orthogonal projection matrix if and only if $\bH$ is symmetric.
That is:
\begin{itemize}
\item If $\bH^2=\bH$ and $\bH^\top=\bH$, then $\bH$ is an orthogonal projector.
\item If $\bH^2=\bH$ and $\bH^\top\neq \bH$, then $\bH$ is an oblique projector.
\end{itemize}
\end{lemma}
\begin{proof}[of Lemma~\ref{lemma:symmetric-projection-matrix}]
Suppose $\bH$ is an orthogonal projection matrix, which projects vectors onto a subspace $\mathcalV$. Then any vectors $\bv$ and $\bw$ can be decomposed into a vector lies in $\mathcalV$ ($\bv_p$ and $\bw_p$) and a vector lies in $\mathcalV^\perp$ ($\bv_n$ and $\bw_n$), so that
$$
\begin{aligned}
\bv &= \bv_p + \bv_n
\qquad \text{and}\qquad 
\bw = \bw_p + \bw_n.
\end{aligned}
$$
Since the projection matrix $\bH$  projects vectors onto $\mathcalV$, it follows that $\bH\bv = \bv_p$ and $\bH\bw = \bw_p$, whence we have 
$$
\begin{aligned}
(\bH\bv)^\top \bw
&=\bv_p^\top\bw=\bv_p^\top (\bw_p + \bw_n)   &\qquad  \bv^\top (\bH\bw) &=\bv^\top \bw_p =(\bv_p+\bv_n)^\top \bw_p \\
&=\bv_p^\top \bw_p + \bv_p^\top\bw_n=\bv_p^\top \bw_p; &\qquad &=\bv_p^\top \bw_p  +\bv_n^\top \bw_p=\bv_p^\top \bw_p,
\end{aligned}
$$
where the last equations follow from the fact that $\bv_p$ is perpendicular to $\bw_n$, and $\bv_n$ is perpendicular to $\bw_p$. 
Therefore, we conclude that
$$
\begin{aligned}
(\bH\bv)^\top \bw = \bv^\top (\bH\bw) 
\quad\implies\quad 
\bv^\top \bH^\top \bw = \bv^\top \bH\bw,
\end{aligned}
$$
which implies $\bH^\top = \bH$.

Conversely, if a projection matrix $\bH$ (not necessarily an orthogonal projection) is symmetric, then any vector $\bv$ can be decomposed into $\bv = \bH\bv + (\bI-\bH)\bv$. If we can prove $\bH\bv$ is perpendicular to $(\bI-\bH)\bv$, then we complete the proof. To see this, we have 
$$
\begin{aligned}
(\bH\bv)^\top (\bI-\bH)\bv 
&= \bv^\top \bH^\top (\bI-\bH)\bv
=\bv^\top  (\bH^\top-\bH^\top\bH)\bv \\
&=\bv^\top  (\bH-\bH\bH)\bv
=\bv^\top  (\bH-\bH)\bv = \bzero,
\end{aligned}
$$
which completes the proof.
\end{proof}

We claimed earlier that orthogonal projection minimizes the distance between a vector $\by $ and its projection $\widehat{\by}$. 
We now rigorously prove this important property.
\begin{theoremHigh}[Minimum distance in orthogonal projection]\label{theorem:minimal-distance-orthogonal-projection}
Let $\mathcalV$ be a subspace of $\real^n$ and $\bH\in\real^{n\times n}$ be an orthogonal projection matrix onto $\mathcalV$. Then, given any vector $\by\in\real^n$, it follows that
$$
\normtwo{\by - \bH\by}^2 \leq \normtwo{\by - \bv}^2, \qquad \forall\, \bv \in \mathcalV.
$$
\end{theoremHigh}
\index{Spectral decomposition}
\begin{proof}[of Theorem~\ref{theorem:minimal-distance-orthogonal-projection}]
Let $\bH = \bQ\bLambda\bQ^\top \in \real^{n\times n}$ be the spectral decomposition of the orthogonal projection matrix $\bH$, 
where $\bQ=[\bq_1, \bq_2, \ldots, \bq_n]$ is the column partition of $\bQ$, and $\bLambda=\diag(\lambda_1, \lambda_2, \ldots, \lambda_n)$. Let $\dim(\mathcalV) = r$. Then, from Lemma~\ref{proposition:eigenvalues-of-projection}, the only possible eigenvalues of the orthogonal projection matrix are 1 and 0. Without loss of generality, let $\lambda_1=\lambda_2=\ldots=\lambda_r=1$ and $\lambda_{r+1}=\lambda_{r+2}=\ldots=\lambda_n=0$. Then, it follows that 
\begin{itemize}
\item $\{\bq_1, \bq_2, \ldots, \bq_n\}$ is an orthonormal basis of $\real^n$.
\item $\{\bq_1, \bq_2, \ldots, \bq_r\}$ is an orthonormal basis of $\mathcalV$. So for any vector $\bv\in \mathcalV$, we have $\bv^\top\bq_i=0$ for $i\in \{r+1, r+2, \ldots, n\}$.
\end{itemize}
 Then we have,
$$
\small
\begin{aligned}
\normtwo{\by - \bH\by}^2 
&\stackrel{\dag}{=} \normtwo{\bQ^\top\by - \bQ^\top\bH\by}^2 
= \sum_{i=1}^{n}(\by^\top \bq_i - (\bH\by)^\top \bq_i)^2
\stackrel{\ddag}{=}\sum_{i=1}^{n}(\by^\top \bq_i - \by^\top\bH \bq_i)^2 \\
&\stackrel{*}{=}\sum_{i=1}^{n}(\by^\top \bq_i - \lambda_i\by^\top \bq_i)^2
\stackrel{+}{=}0+\sum_{i=r+1}^{n}(\by^\top \bq_i)^2 
\leq \sum_{i=1}^{r}(\by^\top\bq_i - \bv^\top\bq_i)^2+ \sum_{i=r+1}^{n}(\by^\top \bq_i)^2 \\
&\stackrel{\bot}{=}\normtwo{\bQ^\top \by - \bQ^\top \bv}^2  
=\normtwo{\by - \bv}^2,
\end{aligned}
$$
where the equality ($\dag$) follows from the invariance under orthogonal transformation, the equality ($\ddag$) follows since $\bH$ is symmetric, 
the equality ($*$) follows from $\bH\bQ =  \bQ\bLambda$, the equality ($+$) follows since the eigenvalues are 1 or 0, and the equality ($\bot$) follows since $\bv^\top\bq_i=0$ for $i>r$. 
This completes the proof.
\end{proof}

Next, we examine the geometric relationship between a vector $\by$ and its orthogonal projection $\bH\by$.
\begin{lemma}[Angle between the original and projected vectors]\label{lemma:angle-orthogonal-projection}
Let $\bH$ be an orthogonal projection onto $\mathcalV$. Then,
\begin{enumerate}[(i)]
\item  $\by^\top (\bH\by) \geq 0$, meaning the angle between $\by$ and $\bH\by$ is less than or equal to $90^\circ$;
\item  $\normtwo{\bH\by}^2 \leq \normtwo{\by}^2$, meaning the length of the projected vector is no greater than the original vector.
\end{enumerate}
\end{lemma}
\begin{proof}[of Lemma~\ref{lemma:angle-orthogonal-projection}]
According to the definition of the orthogonal projection, we have $\by^\top (\bH\by) =\by^\top \bH (\bH\by)=\by^\top \bH^\top (\bH\by) =\normtwo{\bH\by}^2 \geq 0$.
And we could decompose $\by$ by 
$$
\begin{aligned}
\normtwo{\by}^2 = \normtwo{(\bI-\bH+\bH)\by}^2 
&= \normtwo{(\bI-\bH)\by}^2 + \normtwo{\bH\by}^2 + 2 \by^\top (\bI-\bH)^\top\bH\by\\
&= \normtwo{(\bI-\bH)\by}^2 + \normtwo{\bH\by}^2 \geq \normtwo{\bH\by}^2.
\end{aligned}
$$
This completes the proof.
\end{proof}


In conclusion, to determine the OLS solution, we define the \textit{projection matrix} as an idempotent matrix. 
For it to represent an orthogonal projection, we add the condition that the matrix must also be symmetric. Through this orthogonal projection, we illustrate that the distance between the original vector and its projection is minimized.

We now further explore the relationship between the OLS solution and orthogonal projection.
\begin{proposition}[Projection matrix from a set of vectors]\label{proposition:projection-from-matrix}
Let $\bx_1, \bx_2, \ldots, \bx_p\in\real^n$ be linearly independent vectors such that $\cspace([\bx_1, \bx_2, \ldots, \bx_p]) = \mathcalV$, and assume $n\geq p$. 
Then, the orthogonal projection onto the subspace $\mathcalV$ can be expressed  as:
$$
\bH = \bX(\bX^\top\bX)^{-1}\bX^\top,
$$
where $\bX\in \real^{n\times p}$ is the matrix whose columns are  $\bx_1, \bx_2, \ldots, \bx_p$.
\end{proposition}
\begin{proof}[of Proposition~\ref{proposition:projection-from-matrix}]
It can be easily verified that $\bH$ is symmetric and idempotent. By SVD of $\bX=\bU\bSigma\bV^\top$, we have $\bH = \bX(\bX^\top\bX)^{-1}\bX^\top = \bU\bSigma(\bSigma^\top\bSigma)^{-1} \bSigma^\top \bU^\top$. Let $\bU=[\bu_1, \bu_2, \ldots, \bu_n]$ be the column partition of $\bU$. From Theorem~\ref{theorem:svd-four-orthonormal-Basis}, $\{\bu_1, \bu_2, \ldots, \bu_p\}$ is an orthonormal basis of $\cspace(\bX)$. And $\bSigma(\bSigma^\top\bSigma)^{-1} \bSigma^\top$ in $\bH$ is an $n\times n$ matrix, where the upper-left part is a $p\times p$ identity matrix and the other parts are zero. Apply this observation of $\bH$ into spectral theorem, $\{\bu_1, \bu_2, \ldots, \bu_p\}$ is also an orthonormal basis of $\cspace(\bH)$.
Thus, it follows that $\cspace(\bH)=\cspace(\bX)$, and the orthogonal projection $\bH$ is projecting onto $\cspace(\bX)$, from which the result follows.
\end{proof}

The proposition above brings us back to the result we have shown at the beginning of this section. 
For the OLS estimate to minimize $\normtwo{\by-\bX\bbeta}^2$, which searches for an estimate  $\widehatbbeta$ so that $\widehat{\by} = \bX\widehatbbeta$ is in $\cspace(\bX)$, minimizing the distance between $\bX\widehatbbeta$ and $\by$. An orthogonal projection matrix $\bH = \bX(\bX^\top\bX)^{-1}\bX^\top$ can project $\by$ onto the column space of $\bX$, and the projected vector is $\widehat{\by} = \bH\by$. 
By Theorem~\ref{theorem:minimal-distance-orthogonal-projection}, this projection ensures the squared distance between $\widehat{\by}$ and $\by$ is minimized.

To repeat, the hat matrix $\bH$ has a clear  geometric interpretation: it drops a perpendicular to the hyperplane. Here, $\bH = \bX(\bX^\top\bX)^{-1}\bX^\top$ drops $\by$ onto the column space of $\bX$: $\widehat{\by} = \bH\by$.
Idempotency also has a geometric interpretation. Additional $\bH$'s also drop a perpendicular to the hyperplane. But it has no additional effect because we are already on that hyperplane. Therefore, $\bH^2 \by = \bH\by$. This scenario is shown in Figure~\ref{fig:ls-geometric1}. The sum of squared error is then equal to the squared Euclidean distance between $\by$ and $\widehat{\by}$. Thus, the least squares solution for $\bbeta$ corresponds to the orthogonal projection of $\by$ onto the column space of $\bX$.

\index{Column space}
\begin{lemma}[Column space of projection matrices]\label{lemma:column-space-of-projection}
We notice that the hat matrix $\bH = \bX(\bX^\top\bX)^{-1}\bX^\top$ is used to project any vector in $\real^n$ onto the column space of $\bX \in \real^{n\times p}$. In other words, $\bH\by \in \cspace(\bX)$. Notice again that  $\bH\by$ is nothing but a combination of the columns of $\bH$, thus $\cspace(\bH) = \cspace(\bX)$. 

More generally, for any projection matrix $\bH$ that projects a vector onto a subspace $\mathcalV$, then $\cspace(\bH) = \mathcalV$.
\end{lemma}
\begin{proof}[of Lemma~\ref{lemma:column-space-of-projection}]
Since $\bH = \bX(\bX^\top\bX)^{-1}\bX^\top = \bX\bC$ (let $\bC=(\bX^\top\bX)^{-1}\bX^\top$), the columns of $\bH$ are combinations of columns of $\bX$. Thus, $\cspace(\bH) \subseteq \cspace(\bX)$.	
By Lemma~\ref{lemma:rank-of-symmetric-idempotent2}, we have 
$$
\begin{aligned}
\rank(\bH) &= \trace(\bH)=\trace\big(\bX(\bX^\top\bX)^{-1}\bX^\top\big)  \\
&= \trace\big((\bX^\top\bX)^{-1}\bX^\top\bX\big) =\trace(\bI_p)=p.
\end{aligned}
$$
where the third equality follows from the fact that the trace of a product is invariant under cyclical permutations of the factors: $\trace(\bA\bB\bC) = \trace(\bB\bC\bA) = \trace(\bC\bA\bB)$.
Thus, the rank of $\bH$ equals the rank of $\bX$ such that $\cspace(\bH) = \cspace(\bX)$.
\end{proof}

\subsection{Properties of Orthogonal Projection Matrices}
Proposition~\ref{proposition:projection-from-matrix} shows that $\bH=\bX(\bX^\top\bX)^{-1}\bX^\top$ is the orthogonal projector onto $\cspace(\bX)$ if $\bX$ has full column rank, which is the hat matrix we used in the least squares solution. 
More generally, we have the following result.
\begin{theoremHigh}[Orthogonal projector onto general subspaces]\label{theorem:orthogo_genspa}
Let $ \mathcalV $ be  a subspace in $ \real^n $ with dimension $r$. Let $ \bQ_1\in\real^{n\times r} $ and $ \bQ_2 \in\real^{n\times (n-r)}$ be semi-orthogonal matrices (i.e., their columns are mutually orthonormal; Definition~\ref{definition:orthogn_mat}) such that $ \cspace(\bQ_1) = \mathcalV $ and $ \cspace(\bQ_2) = \mathcalV^{\perp} $, where $ \mathcalV^{\perp} $ denotes the orthogonal complement of $ \mathcalV $. Then the orthogonal projectors {onto} $ \mathcalV$ and $ \mathcalV^\perp$ are given by
\begin{equation}\label{equation:orthog_genspa1}
\bH_1 = \bQ_1  \bQ_1^\top
\qquad \text{and}\qquad
\bH_2 = \bQ_2\bQ_2^\top,
\end{equation}
respectively.
\end{theoremHigh}
\begin{proof}[of Theorem~\ref{theorem:orthogo_genspa}]
We have $ \bH_1^2 = \bQ_1\bQ_1^\top\bQ_1\bQ_1^\top = \bQ_1\bQ_1^\top = \bH_1 $ since $\bQ_1^\top\bQ_1=\bI_r$. This shows that $ \bH_1 $ is a projector onto $ \mathcalV $. 
Since $\bH_1$ is symmetric, this completes the proof for the first part.
The second part follows from a similar argument.
\end{proof}

As a direct consequence of the above analysis, Theorem~\ref{theorem:svd-four-orthonormal-Basis} shows four orthogonal projectors in the context of SVD of a matrix.
\begin{theoremHigh}[SVD-related orthogonal projections]\label{theorem:svd_ortho_proj}
Let $\bX=\bU\bSigma\bV^\top$ be the full SVD of $\bX\in\real^{n\times p}$ with rank $r$. Suppose we have the following column partitions
\[
\begin{blockarray}{ccc}
\begin{block}{c[cc]}
\bU=&	\bU_1 & \bU_2   \\
\end{block}
& n\times r & n\times (n-r)   \\
\end{blockarray}
,\qquad
\begin{blockarray}{ccc}
\begin{block}{c[cc]}
\bV=	&	\bV_1 & \bV_2   \\
\end{block}
& p\times r & p\times (p-r)   \\
\end{blockarray},
\]
where $\bU_1$ and $\bV_1$ consist of the first $r$ columns of $\bU$ and $\bV$, respectively.
Then  the four orthogonal projections can be obtained by 
$$
\begin{aligned}
\bV_1\bV_1^\top &= \text{projection onto $\cspace(\bX^\top)$},
\quad
&\bV_2\bV_2^\top &=\text{projection onto $\nspace(\bX)$},\\
\bU_1\bU_1^\top &= \text{projection onto $\cspace(\bX)$},
\quad
&\bU_2\bU_2^\top &= \text{projection onto $\nspace(\bX^\top)$}.\\
\end{aligned}
$$
\end{theoremHigh}

Furthermore, there are also four orthogonal projectors associated with the pseudo-inverse of a matrix.
\begin{theoremHigh}[Pseudo-inverse-related orthogonal projections]\label{theorem:orthogonal-from-pseudo-inverse}
Given any matrix $\bX \in \real^{n\times p}$ and its pseudo-inverse $\bX^+ \in \real^{p\times n}$,  the following hold: 
\begin{itemize}
\item $\bH \triangleq \bX\bX^+$ is the orthogonal projector onto the column space of $\bX$.~\footnote{When $\bX$ has full column rank, this reduces to $\bH = \bX(\bX^\top\bX)^{-1}\bX^\top$ in Proposition~\ref{proposition:projection-from-matrix}.}
\item $\bI-\bH$ is the orthogonal projector onto the null space of $\bX^\top$.
\item $\bP\triangleq \bX^+\bX$ is the orthogonal projector onto the row space of $\bX$.
\item $\bI-\bP$ is the orthogonal projector onto the null space of $\bX$.
\end{itemize}
\end{theoremHigh}
\begin{proof}[of Theorem~\ref{theorem:orthogonal-from-pseudo-inverse}]
Since  $\bH^\top = (\bX\bX^+)^\top=\bX\bX^+=\bH$ from the definition of the pseudo-inverse, we see that $\bH$ is symmetric. Moreover, $\bH$ is idempotent, which confirms that $\bH$ is an orthogonal projector.
From Table~\ref{table:different-inverses}, we conclude that $\cspace(\bH)=\cspace(\bX\bX^+)=\cspace(\bX)$. This implies that $\bH$ is the orthogonal projector onto the column space of $\bX$. Similarly, we can prove $\bP= \bX^+\bX$ is the orthogonal projector onto the row space of $\bX$.
\end{proof}

In fact, $(\bI-\bH)$ is also symmetric idempotent if $\bH$ is symmetric idempotent. 
In general, when $\bH$ projects onto a subspace $\mathcalV$, the matrix $\bI-\bH$ projects onto the orthogonal complement  $\mathcalV^\perp$. Using the notation from Theorem~\ref{theorem:orthogo_genspa}, it follows that $\bH_2\equiv \bI-\bH_1$ since $\bQ\triangleq [\bQ_1, \bQ_2]$ is orthogonal such that $\bQ\bQ^\top = \bQ_1\bQ_1^\top+\bQ_2\bQ_2^\top = \bI$.
Alternatively, we have the following result.

\begin{proposition}[Project onto $\mathcalV^\perp$]\label{proposition:orthogonal-projection}
Let $\mathcalV$ be a subspace, and let $\bH$ be an orthogonal projector onto $\mathcalV$. Then, $\bI-\bH$ is the orthogonal projection matrix onto $\mathcalV^\perp$.~\footnote{$\bI-\bH$ is called a \textit{complementary projector} of $\bH$, vice versa}

The claim can be extended further that suppose $\mathcalV_1 \subseteq \mathcalV$ and $\mathcalV_2 \subseteq \mathcalV^\perp$. Then, $\bH_1$ is the orthogonal projector that projects onto $\mathcalV_1$ and $\bH_2$ is the orthogonal projector that projects onto $\mathcalV_2$ if and only if $\bH_1\bH_2 = \bzero$.
\end{proposition}
\begin{proof}[of Proposition~\ref{proposition:orthogonal-projection}]
First, $(\bI-\bH)$ is symmetric, $(\bI-\bH)^\top = \bI - \bH^\top = \bI-\bH$ since $\bH$ is symmetric. And 
$$
(\bI-\bH)^2 = \bI^2 -\bI\bH -\bH\bI +\bH^2 = \bI-\bH.
$$
Thus, $\bI-\bH$ is an orthogonal projection matrix. By spectral theorem again, let $\bH =\bQ\bLambda\bQ^\top$. Then, $\bI-\bH = \bQ\bQ^\top - \bQ\bLambda\bQ^\top = \bQ(\bI-\bLambda)\bQ^\top$. Hence the column space of $\bI-\bH$ is spanned by the eigenvectors of $\bH$ corresponding to the zero eigenvalues of $\bH$ (by Proposition~\ref{proposition:eigenvalues-of-projection}), which coincides with $\mathcalV^\perp$.

For the second part, since $\cspace(\bH_1) = \mathcalV_1$ and $\cspace(\bH_2) = \mathcalV_2$, every column of $\bH_1$ is perpendicular to each column of $\bH_2$. Thus, $\bH_1\bH_2=\bzero$. Conversely, suppose $\bH_1\bH_2=\bzero$, then $\bH_1(\bH_2\by) = \bzero$ for all $\by$. Thus $\mathcalV_1\perp \mathcalV_2$.
\end{proof}
In particular, from the above result, it can be easily verified when $\mathcalV_1 = \mathcalV$ and $\mathcalV_2 = \mathcalV^\perp$, we have $\bH(\bI-\bH) = \bzero$. 

\index{Uniqueness}
A projection matrix that projects any vector onto a subspace is not unique in general. However, when we restrict to orthogonal projections, the corresponding projection matrix becomes unique.
\begin{proposition}[Uniqueness of orthogonal projection]\label{proposition:unique-projection-orthogonal}
If $\bP$ and $\bH$ are orthogonal projection matrices onto the same subspace $\mathcalV$, then $\bP = \bH$.
\end{proposition}
\begin{proof}[of Proposition~\ref{proposition:unique-projection-orthogonal}]
For any vector $\bv$ in $\real^n$, it can be factored into a vector $\bv_p$ in $\mathcalV$ and a vector $\bv_n$ in $\mathcalV^\perp$ such that $\bv = \bv_p+\bv_n$ and $\bv_p^\top \bv_n=0$. Then, we have
$$
\bP \bv = \bv_p = \bH\bv,
$$
such that $(\bP-\bH)\bv = \bzero$. Since any vector $\bv\in \real^n$ is in the null space of $\bP-\bH$, it follows that $\bP-\bH$ is of rank 0, and $\bP=\bH$.
\end{proof}

A direct consequence of Proposition~\ref{proposition:unique-projection-orthogonal} and Theorem~\ref{theorem:orthogo_genspa} is the following result on the uniqueness of orthogonal projectors constructed from different sets of orthonormal bases.
\begin{corollary}[Uniqueness of orthogonal projection]\label{corollary:unique-projection-orthogonal}
Let $\bU, \bQ\in\real^{n\times r}$ be semi-orthogonal matrices such that $\bU\neq \bQ$ and $\cspace(\bU)=\cspace(\bQ)$. Then the orthogonal projectors $\bH_1\triangleq \bU\bU^\top$ and $\bH_2\triangleq \bQ\bQ^\top$ are the same.
\end{corollary}
The proof is straightforward from Proposition~\ref{proposition:unique-projection-orthogonal} and Theorem~\ref{theorem:orthogo_genspa}. Alternatively, we provide a self-contained proof below.
\begin{proof}[of Corollary~\ref{corollary:unique-projection-orthogonal}]
Given that the column spaces of $ \bU $ and $ \bQ $ are the same, there exists an orthogonal matrix $ \bZ \in \real^{r \times r} $ such that $ \bU = \bQ\bZ $. 
The existence of such a matrix $\bZ$ is trivial since $\cspace(\bU)=\cspace(\bQ)$. To see that $\bZ$ is orthogonal, we have $\bI_r=\bU^\top\bU=\bZ^\top\bQ^\top\bQ\bZ=\bZ^\top\bZ$. Therefore, $\bZ$ is orthogonal.

Now, let's calculate $ \bU\bU^\top $:
$$
\bU\bU^\top = (\bQ\bZ)(\bQ\bZ)^\top = \bQ\bZ(\bZ^\top \bQ^\top) = \bQ(\bZ\bZ^\top)\bQ^\top=\bQ\bQ^\top.
$$
This completes the proof.
\end{proof}

\begin{remark}[Equivalence between SVD and QR projections]\label{remark:svd_qr_projs}
Let $\bX\in\real^{n\times p}$ have full column rank.
Suppose $[\bU_1, \bU_2]\bSigma[\bV_1, \bV_2]^\top$ and $[\bQ_1, \bQ_2]\bR$ (with $\bU_1, \bQ_1\in\real^{n\times p}$ ) are the full SVD and QR decompositions of $\bX$, respectively. 
Then, $\bU_1\bU_1^\top =\bQ_1\bQ_1^\top$ and $\bU_2\bU_2^\top =\bQ_2\bQ_2^\top$ are two set of orthogonal projectors onto $\cspace(\bX)$ and $\nspace(\bX^\top)$, respectively.
\end{remark}
\index{Nested projection}

\begin{proposition}[Nested projection]\label{proposition:nested-projection}
Let $\mathcalV_1 \subseteq \mathcalV_2 \subseteq \real^n$ be two nested linear subspaces. 
Suppose $\bH_1$ is the orthogonal projection onto $\mathcalV_1$, and $\bH_2$ is the orthogonal projection onto $\mathcalV_2$. 
Then the following properties hold:
\begin{enumerate}[(i)]
\item  $\bH_2\bH_1 = \bH_1 = \bH_1\bH_2$;
\item  $\bH_2-\bH_1$ is also an orthogonal projection.
\end{enumerate}
\end{proposition}
\begin{proof}[of Proposition~\ref{proposition:nested-projection}]
For all $\by \in \real^n$, we have $\bH_1\by \in \mathcalV_1$. This implies $\bH_1\by\in \mathcalV_1\subseteq\mathcalV_2$. Thus,
$$
\bH_2(\bH_1\by) = \bH_1\by. \qquad (\text{from Definition~\ref{definition:projection-matrix}})
$$
Then $(\bH_2\bH_1 -\bH_1)\by = \bzero$ for all $\by\in \real^n$. That is, the dimension of the null space $\nspace(\bH_2\bH_1 -\bH_1)=n$ and the rank of $\bH_2\bH_1 -\bH_1$ is 0, which results in $\bH_2\bH_1 =\bH_1$.

For $\bH_1\bH_2$, both $\bH_1$ and $\bH_2$ are symmetric such that $\bH_1\bH_2 = \bH_1^\top\bH_2^\top = (\bH_2\bH_1)^\top = \bH_1^\top = \bH_1$, which completes the proof of part (i).

To see the second part, we notice that $(\bH_2-\bH_1)^\top = \bH_2-\bH_1$ and 
$$
\begin{aligned}
(\bH_2-\bH_1)^2 &= \bH_2^2-\bH_2\bH_1 - \bH_1\bH_2 + \bH_1^2
=\bH_2-\bH_1 - \bH_1 + \bH_1 
=\bH_2 - \bH_1,
\end{aligned}
$$
which states that $\bH_2-\bH_1$ is both symmetric and idempotent. This completes the proof.
\end{proof}

\index{Positive semidefinite}
To conclude, we claim that orthogonal projection matrices are positive semidefinite (PSD).
\begin{proposition}[Symmetric projection]\label{proposition:symmetric-projection-psd}
Any orthogonal projection matrix $\bH$ is positive semidefinite.
\end{proposition}
\begin{proof}[of Proposition~\ref{proposition:symmetric-projection-psd}]
Since $\bH$ is symmetric and idempotent. For any vector $\bx$, we have 
$$
\bx^\top \bH \bx = \bx^\top \bH\bH \bx = \bx^\top \bH^\top\bH \bx=\normtwo{\bH \bx} \geq 0.
$$
Thus, $\bH$ is PSD.
\end{proof}
\index{Symmetric projection}

\subsection{Properties of Oblique Projection Matrices}\label{section:prop_obli_proj}

Proposition~\ref{proposition:projection-from-matrix} highlights the role of orthogonal projection in the standard least squares problem. This projection plays a central role in estimating the best fit solution by minimizing the sum of squared residuals. However, in more general settings where the errors are not assumed to be isotropic or homoskedastic, such as in the \textit{generalized least squares (GLS)} framework, the notion of projection becomes more nuanced. In these cases, we encounter an oblique projection, which allows for projections along directions that are not necessarily orthogonal with respect to the standard Euclidean inner product. We will explore this concept in detail in Section~\ref{section:generalizedLS}, where we discuss how GLS accounts for correlations and heteroskedasticity in the error terms through the use of a weighting matrix.

As mentioned previously, a matrix $ \bP \in \real^{n \times n} $ that satisfies $ \bP^2 = \bP $ and $ \bP^\top \neq \bP $ is an \textit{oblique projector} (Lemma~\ref{lemma:symmetric-projection-matrix}). It splits any vector $ \by \in \real^n $ into a sum $ \by = \bP\by + (\bI - \bP)\by $:
$$
\bP\by \in \cspace(\bP)
\qquad\text{and}\qquad 
\bP\by \not\perp (\bI - \bP)\by.
$$
Consider first the two-dimensional case. Let $ \bu $ and $ \bv $ be unit vectors in $ \real^2 $ such that $ \cos(\theta) = \bu^\top \bv > 0 $. 
If $\bu\neq \bv$, then 
\begin{itemize}
\item $ \bP \triangleq \bu (\bv^\top \bu)^{-1} \bv^\top = \frac{1}{\cos(\theta)} \bu\bv^\top $ is the oblique projector onto $ \bu $ along the orthogonal complement of $ \bv $. That is, $\bP\by$ is a scalar multiple of $\bu$ and $\bv^\top(\bI-\bP)\by=\bzero$ for any $\by\in\real^n$. 
\item $ \bP^\top = \bv (\bu^\top \bv)^{-1} \bu^\top= \frac{1}{\cos(\theta)}  \bv\bu^\top $ is the oblique projector onto $ \bv $ along the orthogonal complement of $ \bu $. That is, $\bP\by$ is a scalar multiple of $\bv$ and $\bu^\top(\bI-\bP)\by=\bzero$ for any $\by\in\real^n$.
\end{itemize} 
If $ \bu = \bv $, then $ \bP $ is an orthogonal projector and $ \cos (\theta) = 1 $ (see the subsection below Theorem~\ref{theorem:qr-decomposition-in-ls}, the projections used in the QR decomposition). When $ \bv $ is almost orthogonal to $ \bu $, then $ \normtwo{\bP} = 1 / \cos(\theta) $ becomes large.

We showed in Proposition~\ref{proposition:projection-from-matrix} that $\bH\triangleq\bX (\bX^\top \bX)^{-1}  \bX^\top$ is an orthogonal projector onto $\cspace(\bX)$ if $\bX\in\real^{n\times p}$ has full column rank.
Using the matrix $\bX$, we can also find an oblique projector by introducing a positive definite matrix.
\begin{proposition}[Oblique projector onto $\cspace(\bX)$]\label{proposition:obli_cspacx}
Let $\bX\in\real^{n\times p}$ have full column rank $p$ ($p\leq n$), and let $ \bOmega \neq \bI \in\real^{n\times n}$ be positive definite. Then,
\begin{equation}\label{equation:obproj_xxt1}
\bP \triangleq \bX (\bX^\top \bOmega^{-1} \bX)^{-1} \bX^\top \bOmega^{-1}, 
\end{equation}
is an oblique projector \textbf{onto} $ \cspace(\bX) $ \textbf{along}   the space $ \bOmega\cspace(\bX)^\perp $:
\begin{equation}\label{equation:obproj_xxt2}
\bP\by \in \cspace(\bX)
\qquad\text{and}\qquad 
(\bI - \bP)\by \in \bOmega\cspace(\bX)^\perp \equiv \bOmega\nspace(\bX^\top). 
\end{equation}
\end{proposition}
\begin{proof}[of Proposition~\ref{proposition:obli_cspacx}]
Let $ \mathcalC \triangleq \cspace(\bX) $, and let $ \mathcalC^\perp $ be the orthogonal complement of $ \mathcalC $.
Since $ \bP^2 = \bP $, $ \bP $ is a projector.
For any $ \by \in \real^n $, $ \bP \by \in \cspace(\bX) $.
On the other hand, if $ \by \in \cspace(\bX) $, say $ \by = \bX \bbeta $, then:
$$
\bP \by = \bX (\bX^\top \bOmega^{-1} \bX)^{-1} (\bOmega^{-1} \bX)^\top \bX \bbeta = \bX \bbeta = \by.
$$
So $ \bP $ is a projector onto $ \cspace(\bX) $.

Let $\mathcalS \triangleq \{ \bu \in \real^n \mid \bP \bu = \bzero \}$.
We want to prove $\mathcalS = \bOmega \mathcalC^\perp$.
To see this, we first suppose $ \by \in \mathcalS $, i.e., $ \bP \by = \bzero $.
Since $\bX$ has full column rank, then
$$
(\bX^\top \bOmega^{-1} \bX)^{-1} (\bOmega^{-1} \bX)^\top \by = \bzero
\qquad\implies \qquad 
(\bOmega^{-1} \bX)^\top \by = \bzero  
\iff  \bX^\top (\bOmega^{-1} \by) = \bzero.
$$
Therefore, $ \bOmega^{-1} \by \in \mathcalC^\perp \implies \by \in \bOmega \mathcalC^\perp  \implies \mathcalS\subseteq\bOmega \mathcalC^\perp$.

Conversely, suppose $ \by = \bOmega \bz $ with $ \bz \in \mathcalC^\perp $.
Then,
$
\bX^\top \bOmega^{-1} \by = \bX^\top \bz = \bzero.
$
Therefore, $\bP \by = \bzero$.
This concludes that $ \bP $ is an oblique projection onto $ \cspace(\bX) $, along $ \bOmega \cspace(\bX)^\perp $.
\end{proof}

More generally, we have the following result.
\begin{theoremHigh}[Oblique projector onto general subspaces]\label{theorem:obliq_genspa}
Let $ \mathcalV $ and $ \mathcalW $ be two complementary subspaces in $ \real^n $:
$$\mathcalV \cap \mathcalW = \bzero 
\qquad \text{and}\qquad 
\mathcalV \cup \mathcalW = \real^n.  
$$
Let $ \bU_1\in\real^{n\times r} $ and $ \bV_1 \in\real^{n\times (n-r)}$ be semi-orthogonal matrices (i.e., their columns are mutually orthonormal; Definition~\ref{definition:orthogn_mat}) such that $ \cspace(\bU_1) = \mathcalV $ and $ \cspace(\bV_1) = \mathcalW^{\perp} $, where $ \mathcalW^{\perp} $ is the orthogonal complement of $ \mathcalW $. Then the oblique projector \textbf{onto} $ \mathcalV $ \textbf{along} $ \mathcalW $ is
\begin{equation}\label{equation:obliq_genspa1}
\bP_{\mathcalV, \mathcalW} = \bU_1 (\bV_1^\top \bU_1)^{-1} \bV_1^\top. 
\end{equation}
Similarly, let $ \bU_2\in\real^{n\times (n-r)} $ and $ \bV_2\in\real^{n\times r} $ be semi-orthogonal matrices such that $ \mathcalV^{\perp} = \cspace(\bU_2) $ and $ \mathcalW = \cspace(\bV_2) $. Then,
\begin{align}
\bP_{\mathcalW, \mathcalV} = \bV_2 (\bU_2^\top \bV_2)^{-1} \bU_2^\top; \label{equation:obliq_genspa2}\\
\bP_{\mathcalV, \mathcalW} + \bP_{\mathcalW, \mathcalV} = \bI;     \label{equation:obliq_genspa3}\\
\bP_{\mathcalV, \mathcalW}^\top = \bP_{\mathcalW^{\perp}, \mathcalV^{\perp}}. \label{equation:obliq_genspa4}
\end{align}
\end{theoremHigh}
\begin{proof}[of Theorem~\ref{theorem:obliq_genspa}]
We have $ \bP_{\mathcalV, \mathcalW}^2 = \bU_1 (\bV_1^\top \bU_1)^{-1} \bV_1^\top \bU_1 (\bV_1^\top \bU_1)^{-1} \bV_1^\top = \bP_{\mathcalV, \mathcalW} $. This shows that $ \bP_{\mathcalV, \mathcalW} $ is a projector onto $ \mathcalV $.

\paragraph{Forward implication.}
Let $\mathcalS \triangleq \{ \bu \in \real^n \mid \bP_{\mathcalV, \mathcalW} \bu = \bzero \}$.
We want to prove $\mathcalS = \mathcalW \equiv \cspace(\bV_2) \equiv \cspace(\bV_1)^\perp \equiv\nspace(\bV_1^\top)$.
To see this, we first suppose $ \by \in \mathcalS $, i.e., $ \bP_{\mathcalV, \mathcalW} \by = \bzero $.
Since $\bU_1^\top\bU_1 = \bI_r$, we have
$$
\bU_1 (\bV_1^\top \bU_1)^{-1} \bV_1^\top \by = \bzero
\qquad\implies \qquad 
\bV_1^\top\by=\bzero.
$$
Therefore, $ \by \in \nspace(\bV_1^\top) \implies \by \in \mathcalW \implies \mathcalS\subseteq\mathcalW$.
\paragraph{Backward implication.}
Conversely, suppose $ \by \in\mathcalW $ such that $\bV_1^\top\by=\bzero$.
Then, $\bP_{\mathcalV, \mathcalW} \by = \bzero$.
This concludes that $ \bP_{\mathcalV, \mathcalW} $ is an oblique projection onto $ \mathcalV $, along $ \mathcalW $.

Similarly, $ \bP_{\mathcalW, \mathcalV} = \bV_2 (\bU_2^\top \bV_2)^{-1} \bU_2^\top $ is the projector onto $ \mathcalW $ along $\mathcalV$. 
To prove \eqref{equation:obliq_genspa3}, we first note that the assumption implies $ \bV_1^\top \bV_2 = \bzero $ and $ \bU_2^\top \bU_1 = \bzero $. Then,
$$
\begin{aligned}
\bP_{\mathcalV, \mathcalW} + \bP_{\mathcalW, \mathcalV} 
&= \bU_1 (\bV_1^\top \bU_1)^{-1} \bV_1^\top + \bV_2 (\bU_2^\top \bV_2)^{-1} \bU_2^\top\\
&= [\bU_1,  \bV_2] \left( 
[\bV_1, \bU_2]^\top 
[\bU_1 , \bV_2 ]
\right)^{-1} 
[\bV_1 , \bU_2]^\top \\
&= [\bU_1, \bV_2] [\bU_1, \bV_2]^{-1} [\bV_1 , \bU_2]^{-\top} [\bV_1 , \bU_2]^\top = \bI. 
\end{aligned}
$$
The equality~\eqref{equation:obliq_genspa4} follows from the expression $ \bP_{\mathcalV, \mathcalW}^\top = \bV_1 (\bU_1^\top \bV_1)^{-1} \bU_1^\top $.
This completes the proof.
\end{proof}

Suppose $ \bH \in\real^{n\times n}$ is an orthogonal projector onto a subspace $\mathcalV$ where $\dim(\mathcalV)=r$. And let $\bU_1\in\real^{n\times r}$ be a semi-orthogonal matrix whose columns form an orthonormal basis for the subspace $\mathcalV$. 
Then we have $\bH\equiv \bU_1\bU_1^\top$ (Theorem~\ref{theorem:orthogo_genspa} and Corollary~\ref{corollary:unique-projection-orthogonal}).
Since $\bU_1^\top \bU_1=\bI_r$, it holds that 
\begin{equation}\label{equation:obl_or_ineq}
\normtwo{\bv}^2 \geq \normtwo{\bH \bv}^2 = \normtwobig{\bU_1\bU_1^\top\bv}^2 =\bv^\top \bU_1\bU_1^\top \bU_1\bU_1^\top\bv  =\normtwobig{\bU_1^\top\bv}^2 ,
\; \forall \; \bv \in \real^n,
\end{equation}
where the first inequality follows by Lemma~\ref{lemma:angle-orthogonal-projection}.

\begin{exercise}
Show that the converse of the above result is also true: a projector $ \bH $ is an orthogonal projector only if \eqref{equation:obl_or_ineq} holds for all $\bv\in\real^n$. 
\textit{Hint: See the comparison between orthogonal and oblique projections, as shown in Figure~\ref{fig:ls-geometric1-compare}.}
\end{exercise}

\index{Random noise}
\section{OLS in  Geometry with Noise Disturbance}\label{sec:geometry-noise-disturbance}

We revisit the concept of the orthogonal projection matrix in the context of the least squares problem.
\begin{remark}[Important facts about hat matrix (part 1)]\label{remark:imp_hat_pro1}
Let $\bX\in\real^{n\times p0}$.
\begin{itemize}
\item 1. As we assume $\bX$ is fixed and has full rank with $n\geq p$. It is known that the rank of $\bX$ is equal to the rank of its \textit{Gram matrix}, defined as $\bX^\top\bX$, such that 
\begin{equation}
	\rank(\bX) = \rank(\bX^\top\bX). \nonumber
\end{equation}

\item 2. The rank of an orthogonal projection matrix is the dimension of the subspace onto which it projects. Hence, the rank of $\bH$ is $p$ when $\bX$ has full rank and $n\geq p$:
\begin{equation}
	\rank(\bH) = \rank(\bX(\bX^\top\bX)^{-1}\bX^\top)  = p.\nonumber
\end{equation}
\item 3. The column space of $\bH$ is identical to the column space of $\bX$.
\end{itemize}
\end{remark}

Now suppose the ideal output $\by$ comes from some true function $g(\bX) \in \cspace(\bX) $ such that the observed output $\rvy$ is modeled as 
\begin{equation}
\rvy =g(\bX)+  \bepsilon,
\end{equation}
where $\bepsilon$ represents additive noise, making $\rvy$  a random variable. 
That is, the real observation $\rvy$ is disturbed by some noise random variable.
In this case, we assume that the observed values $\by$ differ from the true function $g(\bX)=\bX\bbeta$ by additive noise. 
This situation is illustrated in Figure~\ref{fig:ls-geometric2}, which provides a geometric interpretation of the least squares problem and serves as a foundation for further developments in this book.
The main components of the model are:
\begin{itemize}
\item Vector of outputs (responses): $\by  \in \real^n$ is an $n\times 1$ vector of observations of the output variable, and $n$ is the sample size.
\item Design matrix: $\bX$ is an $n\times p$ matrix of inputs, and $p$ is the dimension of the inputs for each observation.
\item Vector of parameters: $\bbeta \in \real^p$ is a $p\times 1$ vector of regression coefficients.
\item Vector of noises: $\bepsilon \in \real^n$ is an $n\times 1$ vector of noises.
\item Vector of errors (residuals): $\be \in \real^n$ is an $n\times 1$ vector of errors. For predicted outputs $\widehat{\by}$, $\be = \by - \widehat{\by}$. 
Note that  $\be$ is different from $\bepsilon$: the former results from our linear model fit, while the latter is unobservable. In some literature, $\be$ is denoted as $\widehat{\bepsilon}$ to emphasize its relationship with $\bepsilon$.
\end{itemize}

\begin{figure}[h!]
\centering
\includegraphics[width=0.5\textwidth]{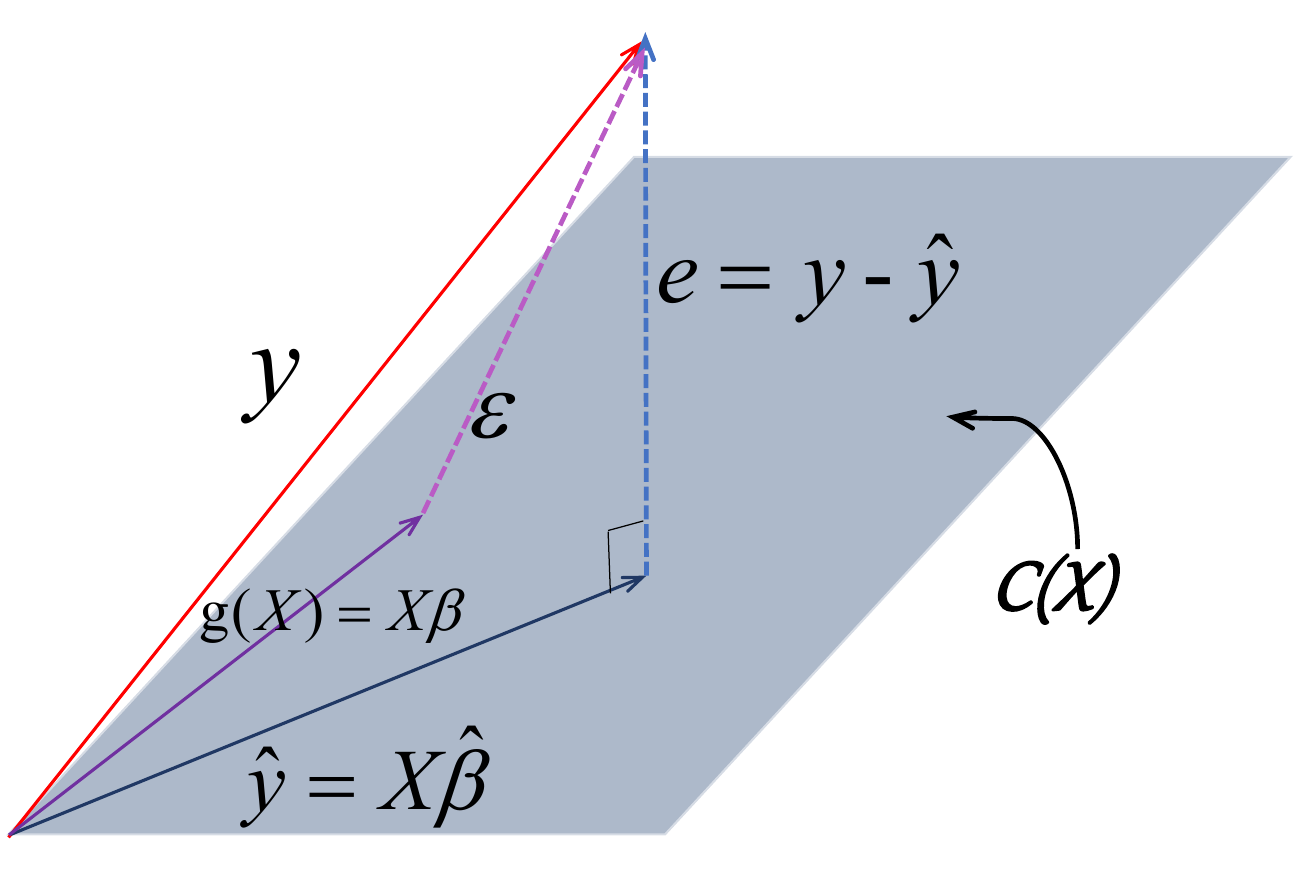}
\caption{Projection onto the hyperplane of $\cspace(\bX)$, with the output disturbed by noise $\bepsilon$.}
\label{fig:ls-geometric2}
\end{figure}

\index{Geometry interpretation}
\index{Hat matrix}
\index{Orthogonal projection}
By introducing the noise vector, we can derive additional important properties of the hat matrix:
\begin{remark}[Important facts about hat matrix (part 2)]\label{remark:pythagoras}
In light of Remark~\ref{remark:imp_hat_pro1}, we have 
\begin{itemize}
\item 4. Error vector $\be = \by-\widehat{\by} = \underline{(\bI-\bH)\by} = (\bI-\bH)(\bX\bbeta+\bepsilon) = (\bI-\bH)\bX\bbeta + (\bI-\bH)\bepsilon=\underline{(\bI-\bH)\bepsilon}$: projecting $\by$ onto the orthogonal complement of $\cspace(\bX)$ is equivalent to projecting $\bepsilon$ onto the same space. 
This result follows naturally from the geometric interpretation shown in Figure~\ref{fig:ls-geometric2}.

\item 5. The predicted output $\widehat{\by}$ and the residual vector $\be$ are orthogonal. Additionally, $\bH \bepsilon$ and $\be$ are also orthogonal.

\item 6. By the Pythagorean theorem applied to projections: $\normtwo{\by}^2 = \normtwo{\widehat{\by}}^2 + \normtwo{\be}^2$ and  $\normtwo{\bepsilon}^2 = \normtwo{\bH\bepsilon}^2 + \normtwo{\be}^2$.

\item 7. Pythagoras in general: for any orthogonal projection matrix $\bP$, we have $\normtwo{\bx}^2 = \normtwo{\bP\bx}^2 + \normtwo{(\bI-\bP)\bx}^2$.
\end{itemize}
\end{remark}
This general form of the Pythagorean identity can be verified as follows:
$$
\begin{aligned}
\normtwo{\bP\bx}^2 + \normtwo{(\bI-\bP)\bx}^2 
&= \bx^\top\bP^\top\bP\bx + \bx^\top (\bI-\bP)^\top(\bI-\bP)\bx\\
&=\bx^\top\bP\bx + \bx^\top (\bI-\bP)\bx
=\bx^\top\big[\bP\bx+(\bI-\bP)\bx\big] 
=\normtwo{\bx}^2.
\end{aligned}
$$

\section{OLS in Pseudo-Inverse}\label{section:ls_pseudo_inv}
By Theorem~\ref{theorem:rank_def_ls_prop}, when $\bX\in\real^{n\times p}$ does not necessarily have full rank, the unique least squares solution of minimum norm is characterized by 
\begin{equation}
	\be \triangleq \by - \bX\bbeta \perp \cspace(\bX) 
	\qquad \text{and} \qquad 
	\bbeta \perp \nspace(\bX).
\end{equation}
The full SVD $\bX 
= \bU \bSigma \bV^\top
$
provides  orthogonal bases for these two subspaces (Theorem~\ref{theorem:svd-four-orthonormal-Basis}), making the SVD an ideal tool for solving least squares problems.
Consider the minimum-norm least squares problem:
$$
\min_{\bbeta \in \sB} \normtwo{\bbeta}, \quad \sB \triangleq \{ \bbeta \in \real^p \mid \normtwo{\by - \bX\bbeta} = \min \}.
$$
The problem has a unique solution that can be written as $\widehatbbeta_{\text{mn}} = \bX^+ \by$ (Theorem~\ref{theorem:rank_def_ls_prop}),
where the pseudo-inverse of $\bX$ is:
$$
\bX^+ = \bV 
\begin{bmatrix}
	\bSigma^{-1}_1 & \bzero \\
	\bzero & \bzero
\end{bmatrix} 
\bU^\top,
\quad 
\text{with }
\bSigma \triangleq 
\begin{bmatrix}
	\bSigma_1 & \bzero \\
	\bzero & \bzero
\end{bmatrix} 
\text{ and }
\bSigma_1\in\real^{r\times r}.
$$
By the uniqueness of $\bX^+$ (Lemma~\ref{lemma:uniqueness-of-pseudo-inverse}), this does not depend on the particular choice of $\bU$ and $\bV$ in the SVD.

The pseudo-inverse and the singular vectors of $\bX$ also provide simple expressions for orthogonal projections onto the four fundamental subspaces of $\bX$. 
These expressions can be verified using the Penrose conditions (see  \eqref{equation:pseudi-four-equations}, Theorems~\ref{theorem:orthogonal-from-pseudo-inverse} and \ref{theorem:svd_ortho_proj}) and the SVD:
$$
\begin{aligned}
\bP_{\cspace(\bX)} &= \bX\bX^+ = \bU_1 \bU_1^\top, \qquad &\bP_{\cspace(\bX^\top)} &= \bX^+ \bX = \bV_1 \bV_1^\top,\\
\bP_{\nspace(\bX^\top)} &= \bI - \bX\bX^+ = \bU_2 \bU_2^\top, \qquad &\bP_{\nspace(\bX)} &= \bI - \bX^+ \bX = \bV_2 \bV_2^\top,
\end{aligned}
$$
where $\bU_1 = [\bu_1, \bu_2, \ldots, \bu_r]$, $\bV_1 = [\bv_1,\bv_2, \ldots, \bv_r]$, and $r=\rank(\bX)$.

If only some of the four Penrose conditions hold, the corresponding matrix is referred to as a generalized inverse.  
Any matrix $\bX^-$ satisfying the first Penrose condition $\bX\bX^-\bX = \bX$ is called an \textit{inner inverse or $\{1\}$-inverse}. If it satisfies the second condition $\bX^-\bX\bX^- = \bX^-$, it is called an \textit{outer inverse or a $\{2\}$-inverse}.

Let $\bX^-$ be an inner-inverse of $\bX$. Then for all $\by$ such that the system $\bX\bbeta = \by$ is consistent, $\bbeta = \bX^-\by$ is a solution. The general solution can be written
\begin{equation}
\widehatbbeta = \bX^-\by + (\bI - \bX^-\bX)\balpha, \qquad \balpha\in \real^p.
\end{equation}
This form is similar to the one using the pseudo-inverse given in  \eqref{equation:unif_ls}. However, $\bX^-$ is in general not unique, and $\bX^-\by$ may not yield a minimum-norm solution of the least squares problem; see below.

\paragrapharrow{Penrose ($C1$)+($C3$), and least squares inverse.}
For any inner-inverse of $\bX$, it holds that
$$
(\bX\bX^-)^2 = \bX\bX^-\bX\bX^- = \bX\bX^-,
\qquad
(\bX^-\bX)^2 = \bX^-\bX\bX^-\bX = \bX^-\bX.
$$
This shows that both $\bX\bX^-$ and $\bX^-\bX$ are idempotent matrices, and hence (in general, oblique) projectors; see Section~\ref{section:prop_obli_proj}. 
The residual norm $\normtwo{\bX\bbeta - \by}$ is minimized when $\bbeta$ satisfies the normal equation $\bX^\top \bX\bbeta = \bX^\top \by$. Suppose that an inner-inverse $\bX^-$ also satisfies the third Penrose condition:
$
(\bX\bX^-)^\top = \bX\bX^-.
$
Then $\bX\bX^-$ becomes the orthogonal projector onto $\cspace(\bX)$, and in this case, $\bX^-$ is called a \textit{least squares inverse}. 
From the two conditions $\bX\bX^-\bX = \bX$ and $(\bX\bX^-)^\top = \bX\bX^-$, we obtain:
$$
\bX^\top = (\bX\bX^-\bX)^\top = \bX^\top \bX\bX^-.
$$
Therefore, $\bX^\top\bX\bX^-\by = \bX^\top\by$,
which shows that $\bbeta = \bX^-\by$ satisfies the normal equation and therefore is a least squares solution.

\paragrapharrow{Penrose ($C1$)+($C4$).}
A dual result also holds. 
If $\bX^-$ is an inner inverse and $(\bX^-\bX)^\top = \bX^-\bX$, then $\bX^-\bX$ is the orthogonal projector onto $\cspace(\bX^\top)$, and $\bX^-$ is called a minimum-norm inverse. If $\bX\bbeta  = \by$ is consistent, the unique solution for which $\normtwo{\bbeta}$ is smallest satisfies the normal equation of the second kind; see \eqref{equation:consis_minimunorm}:
$$
\bbeta = \bX^\top \bgamma 
\qquad\text{and}\qquad 
\bX\bX^\top \bgamma = \by.
$$
For a minimum-norm inverse $\bX^-$, we again use the identity $\bX\bX^-\bX = \bX$ and $(\bX^-\bX)^\top = \bX^-\bX$ to derive:
$$
\bX^\top = (\bX\bX^-\bX)^\top = \bX^-\bX\bX^\top.
$$
Hence, $\bbeta = \bX^\top \bgamma = \bX^-(\bX\bX^\top \bgamma) = \bX^-\by$, which shows that $\bbeta = \bX^-\by$ is the solution of minimum norm.

\paragrapharrow{Consistency guarantee.}
In the above paragraph, we assumed that the linear system is consistent. 
Consistency can be guaranteed under certain conditions using the concept of left and right inverses (Definition~\ref{definition:one_side_inverse}).
We now show that the linear system $\bX\bbeta=\by$ has a {unique} solution under specific conditions.
\begin{theoremHigh}[Unique linear system solution]\label{theorem:unique-linear-system-solution}
Let $\bX\in \real^{n\times p}$ be left-invertible ($n\geq p$), and let  $\bX_L^{-1}\in\real^{p\times n}$ denote a left inverse of $\bX$. Then, the linear system $\bX\bbeta = \by$ has a \textbf{unique} solution if and only if 
$$
(\bI_n - \bX \bX_L^{-1})\by = \bzero.
$$
In this case, the unique solution is given by
$$
\widehatbbeta = (\bX^\top\bX)^{-1}\bX^\top \by.
$$
\end{theoremHigh}
\begin{proof}[of Theorem~\ref{theorem:unique-linear-system-solution}]
Suppose $\bbeta_0$ is the solution of $\bX\bbeta =\by$, then 
$$
\begin{aligned}
\bX \bX_L^{-1} (\bX \bbeta_0) &= \bX\bX_L^{-1} \by; \\
\bX (\bX_L^{-1} \bX) \bbeta_0 &= \bX \bbeta_0 = \by.
\end{aligned}
$$
That implies $\bX\bX_L^{-1} \by=\by$ and $(\bI_n-\bX\bX_L^{-1})\by = \bzero$. 

Conversely, suppose $(\bI_n - \bX \bX_L^{-1})\by = \bzero$, and let $\bbeta_0 = \bX_L^{-1} \by$. Substituting $\bbeta_0 = \bX_L^{-1} \by$ into $(\bI_n - \bX \bX_L^{-1})\by = \bzero$, we have 
$
\bX\bbeta_0 = \by,
$
which implies $\bbeta_0 = \bX_L^{-1} \by$ is a solution of $\bX\bbeta = \by$ whenever $(\bI_n - \bX \bX_L^{-1})\by = \bzero$.

To prove the uniqueness, suppose $\bbeta_0$ and $\bbeta_1$ are two solutions of $\bX\bbeta = \by$. We have $\bX\bbeta_0=\bX\bbeta_1 = \by$, hence $\bX(\bbeta_0-\bbeta_1) = \bzero$. Since $\bX$ is left-invertible, so that $\bX$ has full column rank $p$. 
The dimension of the row space of $\bX$ is $p$ as well such that the null space of $\bX$ is of dimension 0 (i.e., $\dim(\cspace(\bX^\top)) + \dim(\nspace(\bX))=p$ by the fundamental theorem of linear algebra, see Theorem~\ref{theorem:fundamental-linear-algebra}). Therefore, $\bbeta_0=\bbeta_1$, which completes the proof.
\end{proof}

According to the fundamental theorem of linear algebra (Figure~\ref{fig:lafundamental-ls}), \textcolor{black}{if $\bX$ is left-invertible, its row space spans the entire $\real^p$} (indicating $\bX$ has full column rank $p$). The condition  $(\bI_n - \bX \bX_L^{-1})\by = \bzero$  implies that $\by$ is in the column space of $\bX$ such that $\bX\bbeta=\by$ has at least one solution, and the above proposition shows that this solution is unique.

\index{Linear system}
\index{Linear model}
\begin{theoremHigh}[Always have solution]\label{theorem:always-have-solution-right-inverse}
Suppose $\bX\in \real^{n\times p}$ is right-invertible (which implies $n\leq p$), and let $\bX_R^{-1}$ be a right inverse of $\bX$. 
Then, for any $\by\in \real^n$, the linear system $\bX\bbeta = \by$ has at least one solution, and one such solution is given by:
$$
\widehatbbeta = \bX_R^{-1}\by, 
$$
where $\bX_R^{-1}$ is a right inverse of $\bX$ and the right inverse is not necessarily unique.
\end{theoremHigh}
\begin{proof}[of Theorem~\ref{theorem:always-have-solution-right-inverse}]
It is straightforward to verify that
$
(\bX \bX_R^{-1}) \by = \bI_n \by =\by,
$
which shows that $\bX_R^{-1}\by$ is a solution of $\bX\bbeta=\by$.
\end{proof}
We observe that if $\bX$ is right-invertible, then it has full row rank $n$. According to the fundamental theorem of linear algebra  (Figure~\ref{fig:lafundamental-ls}), \textcolor{black}{the column space of $\bX$ spans the entire space of $\real^n$ if $\bX$ is right-invertible}. Hence, any vector $\by\in \real^n$ lies in the column space of $\bX$, and the system $\bX\bbeta=\by$ always has at least one solution.

\index{Generalized least squares (GLS)}
\section{Generalized Least Squares (GLS)}\label{section:generalizedLS}
We will briefly introduce the \textit{generalized least squares (GLS)} problem in this section. 
In Section~\ref{sec:bayesian-approach}, we will discuss the Bayesian approach to linear models or generalized linear models.
The Gauss-Markov extension to the GLS problem is discussed in Theorem~\ref{theorem:gauss_markov_gls}.

\paragrapharrow{Generalized least squares problem.}
We  consider the following \textit{generalized least squares (GLS)} problem
\begin{align}
	(\textbf{LS}): \qquad 	&\min_{\bbeta} (\by - \bX\bbeta)^\top  (\by - \bX\bbeta)\\
	\implies
	(\textbf{GLS}): \qquad 	&\min_{\bbeta} (\by - \bX\bbeta)^\top {\bOmega}^{-1} (\by - \bX\bbeta),\label{equation:gls_prob_loss}
\end{align}
where $\bOmega$ is positive definite. Since $\bPhi\triangleq \bOmega^{-1}$ is also positive definite, one may wonder why we use $\bOmega^{-1}$ instead of $\bPhi$. 
The reason is that $\bOmega$ has a covariance interpretation within the Gauss-Markov model; see Theorem~\ref{theorem:gauss_markov_gls}.
However, when developing numerical methods for solving the GLS problem,  the notion $\bPhi$ is frequently used; see Section~\ref{section:gls_ellipmgs}.

\index{Spectral decomposition}
Several perspectives can be taken on the GLS solution or prediction:
\begin{enumerate}[(i)]
\item The least squares solution of $ \bbeta $ is  the value $ \widehatbbeta $ that satisfies the \textit{generalized normal equation}
\begin{equation}\label{equation:gls_prob_gne}
(\textbf{GNE}):\qquad 	\bX^\top \bOmega^{-1} \bX \widehatbbeta = \bX^\top \bOmega^{-1} \by .
\end{equation}
This can be obtained, for example, using the first-order optimality condition (Proposition~\ref{proposition:fermat_fist_opt}).
\item Equivalently, the solution $ \widehatbbeta $ satisfies the \textit{orthogonality condition} 
\begin{equation}\label{equation:gls_prob_gne1}
\bX^\top \bOmega^{-1} (\by - \bX \widehatbbeta) = \bX^\top \bOmega^{-1} \be = \bzero.
\end{equation}
\item The predicted vector for $\by$ (when $\bX$ has full column rank and $\bOmega$ is PD) is given by 
\begin{equation}\label{equation:gls_prob_gne3}
\widehatby = \bX  \widehatbbeta = (\bX^\top \bOmega^{-1} \bX )^{-1}\bX^\top \bOmega^{-1} \by .
\end{equation}
This represents the oblique projection of $\by$ using the oblique projector onto $\cspace(\bX)$ along the space $\bOmega \cspace(\bX)^\perp$ (Proposition~\ref{proposition:obli_cspacx}).
\end{enumerate}
The GLS problem can be solved or interpreted through basic LS solutions in three ways:
\begin{itemize}
\item Let $\bOmega=\bQ\bLambda\bQ^\top$ be the spectral decomposition of $\bOmega$. Since $\bOmega$ is positive definite, then by Theorem~\ref{theorem:eigen_charac} the square root of $\bLambda$ exists such that $\bOmega = \bQ\bLambda^{1/2}\bLambda^{1/2}\bQ^\top\triangleq \bG\bG^\top$.
Therefore, $(\by - \bX\bbeta)^\top {\bOmega}^{-1} (\by - \bX\bbeta) = \normtwo{\bG^{-1}\by-\bG^{-1}\bX\bbeta}^2$ such that the GLS problem of $\bbeta$ is equivalent to the basic LS problem of 
\begin{equation}\label{equation:gls_asols}
\min_{\bbeta}\normtwobig{\widetildeby-\widetildebX\bbeta}^2, 
\quad \text{where }\widetildeby\triangleq \bG^{-1}\by, \ \widetildebX\triangleq \bG^{-1}\bX.
\end{equation}
\item $\bOmega$ can also be uniquely denoted as $\bOmega = \bM^2 \triangleq (\bQ\bLambda^{1/2}\bQ^\top) (\bQ  \bLambda^{1/2}\bQ^\top)$, where $\bM$ is also positive definite (Theorem~\ref{theorem:unique-factor-pd}). Thus, the GLS problem can be written as another LS problem of the form \eqref{equation:gls_asols}, where $\widetildeby\triangleq \bM^{-1}\by$ and $\widetildebX\triangleq \bM^{-1}\bX$.
\item $\bOmega$ admits the Cholesky decomposition $\bOmega = \bL\bL^\top$. 
Thus, the GLS problem can be written as another LS problem of the same form \eqref{equation:gls_asols}, where $\widetildeby\triangleq \bL^{-1}\by$ and $\widetildebX\triangleq \bL^{-1}\bX$.
\end{itemize}
Among these three approaches, the third one---based on the Cholesky decomposition---is generally the most straightforward.
The GLS problem can be solved by first computing $\bOmega = \bL\bL^\top$ and then solving the transformed systems $\bL \widetildebX = \bX$ and $\bL\widetildeby = \by$. The normal equation $\widetildebX^\top \widetildebX \bbeta = \widetildebX^\top \widetildeby$ are formed and solved by Cholesky factorization; see Section~\ref{section:ls_cholesky}. Alternatively, one may apply QR factorization to the transformed design matrix:
\begin{equation}\label{equation:qr_gls_raw}
	\bL^{-1} \bX = \bQ \begin{bmatrix} \bR_1\\ \bzero \end{bmatrix}, \quad \bQ = \begin{bmatrix} \bQ_1 & \bQ_2 \end{bmatrix},
\end{equation}
which leads to the solution: $\widehatbbeta = \bR_1^{-1} \bQ_2^\top \bL^{-1} \by$; see Section~\ref{section:ls_qr_gen}.

Computing the Cholesky factorization $\bOmega = \bL\bL^\top$ requires approximately $\sim n^3/3$ floating-point operations (flops) for a dense matrix $\bOmega$. 
Forming the transformed matrices $\widetildebX = \bL^{-1} \bX$ and $\widetildeby = \bL^{-1} \by$ requires a further $\sim n^2 p $ flops. This may be prohibitive unless $\bOmega$ has a favorable structure. When $\bOmega$ is a banded matrix with small bandwidth $w$, the cost of the Cholesky factorization reduces to roughly $\sim n w(w + 3)$ flops. 

\index{Weighted least squares}
\index{WLS}
\index{GLS}
\index{Generalized least squares}
\paragrapharrow{Weighted least squares (WLS).} The \textit{weighted least squares} problem is a special GLS problem in which  $\bOmega$ is chosen to be   a diagonal matrix. When the diagonal matrix is the identity matrix, WLS reduces to the ordinary least squares problem. See also Problem~\ref{problem:wei_ls}.

\index{Minimum-norm solution}
\paragrapharrow{Generalized minimum-norm problem.}
Consider a consistent linear system $\bX^\top \balpha = \bz$ of full row rank, the \textit{generalized minimum-norm (GMN) problem} is
\begin{equation}\label{equation:gmn_prob}
(\textbf{GMN}):\qquad 	\min_{\balpha} \balpha^\top \bOmega \balpha \quad \text{s.t.} \quad \bX^\top \balpha = \bz.
\end{equation}
The corresponding \textit{generalized normal equation} of the second kind is
\begin{equation}\label{equation:gls_prob_gne2}
(\textbf{GNE2}):\qquad 	\bX^\top \bOmega^{-1} \bX \bgamma = \bz, \qquad \balpha = \bOmega^{-1} \bX \bgamma.
\end{equation}
If $\bOmega = \bL\bL^\top$ is the Cholesky factorization, then $\balpha^\top \bOmega \balpha = \normtwo{\bL^\top \balpha}^2$. Hence problem \eqref{equation:gmn_prob} is equivalent to seeking the minimum-norm solution of the system
$$
\widetildebX^\top \widetildebalpha = \bz \quad \text{with}\quad\widetildebX \triangleq \bL^{-1} \bX, \ \widetildebalpha \triangleq \bL^\top \balpha.
$$
Alternatively, using the QR factorization gives $\balpha = \bL^{-\top} \bQ_1 (\bR^{-\top} \bz)$; see Section~\ref{section:ls_qr_gen}.

Similar to the augmented LS problem \eqref{equation:sys_aug_sys}, 
Problems GLS and GMN are special cases of the generalized augmented  LS problem:
\begin{equation}\label{equation:gls_aug_sys}
\text{(GAuLS)}:\qquad 
\bF \begin{bmatrix} \balpha \\ \bbeta \end{bmatrix} 
\triangleq \begin{bmatrix} \bOmega & \bX \\ \bX^\top & \bzero \end{bmatrix} \begin{bmatrix} \balpha \\ \bbeta \end{bmatrix} = \begin{bmatrix} \by \\ \bz \end{bmatrix}
\quad \by \in \real^n, \quad \bz \in \real^p.
\end{equation}
This system matrix $\bF$ is nonsingular if and only if $\rank(\bX) = p$ and
$$
\cspace(\bOmega) \cap \cspace(\bX^\top) = \{\bzero\}.
$$
In fact, if $\bOmega$ is positive definite, then it follows that the matrix $\bF \in \real^{(n+p) \times (n+p)}$ of system \eqref{equation:gls_aug_sys} has $n$ positive and $p$ negative eigenvalues \citep{bjorck2024numerical}. For this reason, \eqref{equation:gls_aug_sys} is called a saddle point system. Eliminating $\balpha$ in \eqref{equation:gls_aug_sys} gives the generalized normal equation for $\bbeta$,
\begin{equation}
\bX^\top \bOmega^{-1} \bX \bbeta = \bX^\top \bOmega^{-1} \by - \bz.
\end{equation}

An explicit expression for the inverse of augmented matrix $ \bF $ is obtained from the Schur complement of $\bOmega$ in $\bF$; see, for example, \citet{lu2021numerical}:
\begin{equation}
\bF^{-1} = \begin{bmatrix} \bOmega & \bX \\ \bX^\top & \bzero \end{bmatrix}^{-1} = \begin{bmatrix} \bOmega^{-1} (\bI - \bT) & \bOmega^{-1} \bX \bS^{-1} \\ \bS^{-1} \bX^\top \bOmega^{-1} & -\bS^{-1} \end{bmatrix},
\end{equation}
where
$$
\bS \triangleq \bX^\top \bOmega^{-1} \bX, \qquad \bT \triangleq \bX \bS^{-1} (\bOmega^{-1} \bX)^\top.
$$
In terms of the QR factorization \eqref{equation:qr_gls_raw}, the inverse is
\begin{equation}
\bF^{-1} = \begin{bmatrix} \bL^{-\top} \bQ_2 \bQ_2^\top \bL^{-1} & \bL^{-\top} \bQ_1 \bR^{-\top} \\ \bR^{-1} \bQ_1^\top \bL^{-1} & -\bR^{-1} \bR^{-\top} \end{bmatrix}.
\end{equation}

\index{Orthogonal projection}
\index{Generalized orthogonal projection}
\subsection{Orthogonal Projection in GLS}
We previously showed that the orthogonal projection matrix
$$
\bH = \bX(\bX^\top\bX)^{-1}\bX^\top, \qquad (\text{project onto $\mathcalV=\cspace(\bX)$})
$$ 
is an orthogonal projection onto the subspace $\mathcalV=\cspace(\bX)$.
Let $\widetildebX \triangleq \bOmega^{-1/2}\bX$, 
We also observe that
$$
\begin{aligned}
	\bH_{\bOmega}&=\widetildebX(\widetildebX^\top\widetildebX)^{-1}\widetildebX^\top \\
	&= \bOmega^{-1/2}\underbrace{\bX(\bX^\top  \bOmega^{-1} \bX)^{-1}\bX^\top\bOmega^{-1}}_{\triangleq \bH_2} , \qquad (\text{project onto $\mathcalV_{\bOmega} = \cspace(\bOmega^{-1/2}\bX)$})\\
	&=\bOmega^{-1/2}\bH_2
\end{aligned}
$$ 
is the orthogonal projection onto the subspace $ \mathcalV_{\bOmega} = \cspace(\bOmega^{-1/2}\bX)$. 
That is, the prediction $\widehatby$ becomes $\widehatby = \bH_{\bOmega}\by$ in the GLS problem (and $\widehatby = \bH\by$ in the LS case).
This also implies 
$$
\bH_2=\bX(\bX^\top  \bOmega^{-1} \bX)^{-1}\bX^\top\bOmega^{-1} , \qquad (\text{project onto $\mathcalV=\cspace(\bX)$})
$$ 
is also an orthogonal projection onto the subspace $\mathcalV = \cspace(\bX)$. 

Before discussing the properties of orthogonal projections in the context of generalized least squares, it is important to clarify the concept of ``symmetry."  
While a matrix $\bA$ is considered symmetric if $\bA = \bA^\top$, a more general definition of symmetry arises when considering inner products.

\index{Inner product}
To see this, the definition of an inner product must satisfy three fundamental properties:
\begin{definition}[Inner product]\label{definition:inner_prod}
In most cases, a vector norm can be derived from the \textit{inner product} of vectors (the inner product of vectors $\bx,\by\in\real^n$ is given by $\innerproduct{\bx,\by}$), which satisfies the following three conditions:
\begin{itemize}
\item \textit{Commutativity}. $\innerproduct{\bx,\by} = \langle\by,\bx\rangle$ for any $\bx,\by\in \real^n$. 
\item \textit{Linearity}. $\langle\lambda_1 \bx_1+\lambda_2\bx_2, \by\rangle = \lambda_1\innerproduct{\bx,\by}+\lambda_2\langle\bx_2,\by\rangle$ for any $\lambda_1,\lambda_2 \in \real$ and $\bx,\by\in \real^n$.
\item \textit{Positive definiteness}. $\langle \bx,\bx \rangle \geq 0$ for any $\bx \in \real^n$, and $\langle \bx,\bx \rangle = 0$ if and only if $\bx=\bzero$.
\end{itemize}
\end{definition}

\index{Symmetry}
\index{Symmetric in terms of inner product}
\index{Generalized norm}
\index{Generalized orthogonal projection}
Using this notion of an inner product, we can now define a more general form of symmetry:
\begin{definition}[Symmetric in terms of inner product]\label{definition:symmetric-inn}
A matrix $\bA \in \real^{n\times n}$ is said to be \textit{symmetric} (with respect to an inner product) if for all $\bx,\by \in \real^n$, such that $\langle\bx, \bA\by\rangle = \langle\bA\bx,\by\rangle$. 
Some specific examples include:
\begin{enumerate}
\item  For the standard inner product $\innerproduct{\bx,\by} = \bx^\top\by$, symmetric $\bA$ means $\bA = \bA^\top$.
\item  Given a positive definite matrix $\bZ$, consider the inner product $\innerproduct{\bx,\by}_{\bZ} = \bx^\top \bZ \by$; see Problem~\ref{prob:inner_zprod}. Symmetric $\bA$ means $\bx^\top \bZ \bA\by = \bx^\top \bA^\top \bZ\by$. 
This leads to the definition of the  \textit{generalized norm}, defined as   $\norm{\bx}^2_{\bZ} = \innerproduct{\bx,\bx}_{\bZ}$, also known as the $\bZ$-norm. When $\bZ=\bI$, this reduces to the standard inner product, standard symmetry, and   standard $\ell_2$ norm, respectively.
\end{enumerate}
\end{definition}

With this generalized definition of symmetry based on inner products, we can extend the idea of orthogonal projection accordingly. Specifically, we define a projection to be orthogonal in terms of the given inner product, which results in the concept of a \textit{generalized orthogonal projection}.
Then, we can check that $\bH_2=\bX(\bX^\top  \bOmega^{-1} \bX)^{-1}\bX^\top\bOmega^{-1}$ is symmetric in terms of $\innerproduct{\bx,\by}_{\bOmega^{-1}}$ such that 
$$
\bx^\top \bOmega^{-1} \bH_2\by = \bx^\top \bH_2^\top \bOmega^{-1}\by,
$$
for all $\bx,\by \in \real^n$.

\index{Generalized orthogonal projection}
\begin{lemma}[Genralized orthogonal projection in GLS]\label{lemma:orthogonal-in-glm}
Consider the GLS problem 
$$
\min_{\bbeta} (\by - \bX\bbeta)^\top {\bOmega}^{-1} (\by - \bX\bbeta),
$$
where $\bOmega$ is fixed and positive definite,  $\bX \in \real^{n\times p}$ is fixed and has full rank with $n\geq p$ (i.e., rank is $p$).~\footnote{This is equivalent to assume  $\rvy = \bX\bbeta + \bepsilon$, where $\bepsilon \sim \normal(\bzero, \sigma^2 \textcolor{black}{\bOmega})$ in the Gauss-Markov linear model; see Chapter~\ref{sec:lr-gaussian-noise}.} 
Then, 
$$
\bH_2=\bX(\bX^\top  \bOmega^{-1} \bX)^{-1}\bX^\top\bOmega^{-1}
$$
is a generalized  orthogonal projection that projects onto the column space of $\bX$.
\end{lemma}
\begin{proof}[of Lemma~\ref{lemma:orthogonal-in-glm}]
We prove $\bH_2$ is an orthogonal projection by showing that it is idempotent and symmetric, and it projects onto the column space of $\bX$. 
It can be readily verified that 
$$
\bH_2^2 = \bX(\bX^\top  \bOmega^{-1} \bX)^{-1}\bX^\top\bOmega^{-1}\bX(\bX^\top  \bOmega^{-1} \bX)^{-1}\bX^\top\bOmega^{-1} = \bH_2, 
$$
and 
$$
\begin{aligned}
\langle\bx, \bH_2\by\rangle_{\bOmega^{-1}} &= \bx^\top \bOmega^{-1} \bH_2\by 
= \bx^\top \bOmega^{-1} \left(\bX(\bX^\top  \bOmega^{-1} \bX)^{-1}\bX^\top\bOmega^{-1}\right)\by\\
&=\bx^\top \bH_2^\top \bOmega^{-1}\by
= \langle\bH_2\bx,\by\rangle_{\bOmega^{-1}}.
\end{aligned}
$$
Since $\bH_2=\bX(\bX^\top  \bOmega^{-1} \bX)^{-1}\bX^\top\bOmega^{-1} \triangleq \bX\bC$, the columns of $\bH_2$ are combinations of the columns of $\bX$, thus $\cspace(\bH_2) \subseteq \cspace(\bX)$.
By Lemma~\ref{lemma:rank-of-symmetric-idempotent2}, we have 
$$
\begin{aligned}
\rank(\bH_2) 
&= \trace(\bH_2) = \trace\big(\bX(\bX^\top  \bOmega^{-1} \bX)^{-1}\bX^\top\bOmega^{-1}\big) \\
&= \trace\big((\bX^\top  \bOmega^{-1} \bX)^{-1}\bX^\top\bOmega^{-1}\bX\big)=\trace(\bI_p)
=p,
\end{aligned}
$$
where the third equality follows from the fact that the trace of a product is invariant under cyclical permutations of the factors.
Thus, the rank of $\bH_2$ equals the rank of $\bX$. We conclude that $\cspace(\bH_2) = \cspace(\bX)$.
Therefore, $\bH_2$ is a generalized orthogonal projection onto the column space of $\bX$.
\end{proof}

The result on minimum distance in orthogonal projection (Theorem~\ref{theorem:minimal-distance-orthogonal-projection}) can also be extended to the generalized case under the $\bOmega^{-1}$-norm.
\begin{theoremHigh}[Minimum distance in generalized orthogonal projection]\label{theorem:minimal_dist_orhoproj_generalized}
Let $\mathcalV$ be a subspace of $\real^n$, and let $\bH_2$ be a generalized orthogonal projection with respect to $\bOmega^{-1}$-inner product onto $\mathcalV$. 
Then, it follows that  
$$
\norm{\by - \bH_2\by}^2_{\bOmega^{-1}} \leq \norm{\by - \bv}^2_{\bOmega^{-1}}, \qquad \forall\, \bv \in \mathcalV.
$$
\end{theoremHigh}
\index{Spectral decomposition}
\begin{proof}[of Theorem~\ref{theorem:minimal_dist_orhoproj_generalized}]
We find that $\bOmega^{-1}\bH_2$, $\bH_2\bOmega$, and $\bOmega^{-1/2}\bH_2\bOmega^{1/2}$ are all symmetric (with respect to the first inner product in Definition~\ref{definition:symmetric-inn}, i.e., the standard symmetry). 
Therefore, they all admit a  spectral decomposition. By trial and error, the right way is to decompose 
$$
\bOmega^{-1/2}\bH_2\bOmega^{1/2} = \bOmega^{-1/2}\bX(\bX^\top  \bOmega^{-1} \bX)^{-1}\bX^\top\bOmega^{-1/2}= \bQ\bLambda\bQ^\top,
$$	
where $\bQ=[\bq_1, \bq_2, \ldots, \bq_n]$ is the column partition of $\bQ$, and $\bLambda=\diag(\lambda_1, \lambda_2, \ldots, \lambda_n)$ contains the eigenvalues of $(\bOmega^{-1/2}\bH_2\bOmega^{1/2})$. Let $\dim(\mathcalV) = r$. 
We notice that $\bOmega^{-1/2}\bH_2\bOmega^{1/2}$ is symmetric and  idempotent such that its only possible eigenvalues  are 1 and 0 by Lemma~\ref{proposition:eigenvalues-of-projection}. Without loss of generality, let $\lambda_1=\lambda_2=\ldots=\lambda_r=1$ and $\lambda_{r+1}=\lambda_{r+2}=\ldots=\lambda_n=0$. 
Then, we have
\begin{itemize}
\item  $\{\bq_1, \bq_2, \ldots, \bq_n\}$ is an orthonormal basis of $\real^n$.
\item $\{\bq_1, \bq_2, \ldots, \bq_r\}$ is an orthonormal basis of $\bOmega^{-1/2}\mathcalV$, which is the subspace $\mathcalV$ rotated by $\bOmega^{-1/2}$. So for any vector $\bv\in \mathcalV$, let $\textcolor{mydarkblue}{\ba \triangleq \bOmega^{-1/2}\bv}$, we have $\ba^\top\bq_i=0$ for $i\in \{r+1, r+2, \ldots, n\}$. 
\end{itemize}
Again, let $\textcolor{mydarkblue}{\bz \triangleq \bOmega^{-1/2}\by}$. 
Since $\bOmega^{-1/2}\bH_2 = \bQ\bLambda\bQ^\top \bOmega^{-1/2}$
$$
\begin{aligned}
\norm{\by - \bH_2\by}_{\textcolor{mydarkblue}{\bOmega^{-1}}}^2 
&= \normtwo{\bQ^\top\bOmega^{-1/2}\by - \bQ^\top\bOmega^{-1/2}\bH_2\by}^2 
=\normtwo{\bQ^\top\bOmega^{-1/2}\by - \bQ^\top\textcolor{black}{\bQ\bLambda\bQ^\top \bOmega^{-1/2}}\by}^2\\
&=\normtwo{\bQ^\top\bz - \bLambda\bQ^\top \bz}^2
=\sum_{i=1}^{n}(\bz^\top \bq_i - \lambda_i\bz^\top \bq_i)^2 
=0+\sum_{i=r+1}^{n}(\bz^\top \bq_i)^2  \\
&\leq \sum_{i=1}^{r}(\bz^\top\bq_i - \ba^\top\bq_i)^2+ \sum_{i=r+1}^{n}(\bz^\top \bq_i)^2 =\normtwo{\bQ^\top \bz - \bQ^\top \ba}^2  \\
&=\normtwo{\bz - \ba}^2 = \normtwo{\bOmega^{-1/2}\by - \bOmega^{-1/2}\bv}^2 = \norm{\by-\bv}_{\textcolor{mydarkblue}{\bOmega^{-1}}}^2,
\end{aligned}
$$
which completes the proof.
\end{proof}

In previous discussions, for any orthogonal projection matrix $\bH$, we established Pythagoras' theorem and the orthogonal property in ordinary least squares (i.e., with inner product denoted by 

In previous discussions, we established Pythagoras' theorem and the orthogonal property for any orthogonal projection matrix $\bH$ in the context of ordinary least squares, where the inner product is defined as
$\innerproduct{\bx,\by} = \bx^\top\by$.
These properties are expressed as:
$$
\begin{aligned}
\normtwo{\by}^2 &= \normtwo{\bH\by}^2 + \normtwo{(\bI-\bH)\by}^2; \\
\bzero &= (\bH\by)^\top \left((\bI-\bH)\by\right).
\end{aligned}
$$
Analogously, in the generalized orthogonal projection, we have 
$$
\begin{aligned}
\norm{\by}^2_{\bOmega^{-1}} &= \norm{\bH_2\by}^2_{\bOmega^{-1}} + \norm{(\bI-\bH_2)\by}^2_{\bOmega^{-1}} ;\\
\by^\top \bOmega^{-1}\by &= \by^\top (\bH_2^\top \bOmega^{-1}\bH_2)\by + \by^\top (\bOmega^{-1}-\bH_2^\top \bOmega^{-1}\bH_2)\by,
\end{aligned}
$$
and 
$$
\begin{aligned}
\bzero &= \langle\bH_2\by,(\bI-\bH_2)\by \rangle_{\bOmega^{-1}}
= (\bH_2\by)^\top \bOmega^{-1}\left((\bI-\bH_2)\by\right).
\end{aligned}
$$

\index{WLS}
\index{Weighted least squares}
\index{Generalized least squares (GLS)}
\subsection{Equivalence between OLS and GLS}

In Theorem~\ref{theorem:minimal-distance-orthogonal-projection}, we prove that for any orthogonal projection $\bH$, the inequality $\normtwo{\by - \bH\by}^2 \leq \normtwo{\by - \bv}^2$ holds for all $\bv\; \in \mathcalV\triangleq\cspace(\bX)$. In the context of generalized least squares, the analogous result holds for any generalized orthogonal projection matrix  $\bH_2$: $\norm{\by - \bH_2\by}^2_{\bOmega^{-1}} \leq \norm{\by - \bv}^2_{\bOmega^{-1}}, \,\,\forall\, \bv \in \mathcalV\triangleq\cspace(\bX)$, where   $\bOmega$ is positive definite. 
Although OLS and GLS generally yield different estimates, we now show that under certain conditions, these two estimators are in fact equivalent.
\index{Equivalence between OLS and GLS}
\begin{theoremHigh}[Equivalence between OLS and GLS]\label{theorem:equivalence-ols-gls}
Let $\bX\in\real^{n\times p}$ with full rank $p$, and let $\bOmega$ be a positive definite matrix.
The ordinary least squares estimate $\widehatbbeta = (\bX^\top\bX)^{-1}\bX^\top \by$ is equivalent to the GLS estimate $\widetildebbeta =(\bX^\top  \bOmega^{-1} \bX)^{-1}\bX^\top\bOmega^{-1}\by$ if and only if 
$$
\cspace(\bOmega^{-1} \bX) = \cspace(\bX).
$$
\end{theoremHigh}
\begin{proof}[of Theorem~\ref{theorem:equivalence-ols-gls}]
Suppose $\widehatbbeta = \widetildebbeta$. Then for all $\by \in \real^n$, we must have 
$$
(\bX^\top\bX)^{-1}\bX^\top \by = (\bX^\top  \bOmega^{-1} \bX)^{-1}\bX^\top\bOmega^{-1}\by,
$$
which implies 
$$
(\bX^\top\bX)^{-1}\bX^\top = (\bX^\top  \bOmega^{-1} \bX)^{-1}\bX^\top\bOmega^{-1}.
$$
Taking the transpose of both sides gives 
$$
\bX (\bX^\top\bX)^{-1} = \bOmega^{-1} \bX (\bX^\top  \bOmega^{-1} \bX)^{-1}.
$$
Since $(\bX^\top\bX)^{-1} $ and $(\bX^\top  \bOmega^{-1} \bX)^{-1}$ are nonsingular matrices, this transformation reflects a change of basis. Therefore, it follows that $\cspace(\bOmega^{-1} \bX) = \cspace(\bX)$.

Conversely, suppose $\cspace(\bOmega^{-1} \bX) = \cspace(\bX)$, there must be a nonsingular matrix $\bA$ such that $\bOmega^{-1} \bX = \bX\bA$ (columns of $\bOmega^{-1} \bX$ are combinations of columns of $\bX$, and the combinations are given by $\bA$). That is $\bX = \bOmega\bX\bA$. Then, we have
$$
\begin{aligned}
(\bX^\top  \bOmega^{-1} \bX)^{-1}\bX^\top\bOmega^{-1}\by 
&=\big(\textcolor{mydarkblue}{(\bOmega\bX\bA)}^\top  \bOmega^{-1} \bX\big)^{-1}\textcolor{mydarkblue}{(\bOmega\bX\bA)}^\top\bOmega^{-1}\by \\
&=\big(\bA^\top\bX^\top\bOmega  \bOmega^{-1} \bX\big)^{-1} \bA^\top\bX^\top\bOmega\bOmega^{-1}\by \\
&=(\bX^\top \bX)^{-1} \bX^\top\by,
\end{aligned} 
$$
which completes the proof.
\end{proof}

\section{Total Least Squares (TLS) and Other Issues}\label{section:totalles_otheriss}
In this discussion, we assume $\bX\in \real^{n\times p}$ has full rank with $n\geq p$, ensuring that $\bX^\top\bX$ is invertible. 
However, if two or more columns of $\bX$ are perfectly correlated, the matrix $\bX$ becomes deficient, and $\bX^\top\bX$ becomes singular. 
To address this issue, one can choose $\bbeta$ that minimizes $\normtwobig{\widehatbbeta}^2$ while satisfying the normal equation. 
That is, we select the least squares solution with the smallest magnitude. 
In Sections~\ref{section:utv_ls} and~\ref{section:ls-via-svd}, we briefly discuss how to use UTV decomposition and SVD to address this rank-deficient least squares problems.

\index{Contion number}
\index{Tikhonov regularization}
\index{$\ell_2$ regularization}
\paragrapharrow{Regularizations and stability.}
However,  a common  problem that arise in the ordinary least square solution is the near-singularity of $\bX$.
Let the full SVD of $\bX$ be $\bX=\bU\bSigma\bV^\top\in\real^{n\times p}$, where $\bU\in\real^{n\times n}$ and $\bV\in\real^{p\times p}$ are orthogonal, and the main diagonal of $\bSigma\in\real^{n\times p}$ contains the singular values. Consequently, $\bX^\top\bX = \bV(\bSigma^\top\bSigma)\bV^\top \triangleq \bV\bS\bV^\top$, where $\bS\triangleq \bSigma^\top\bSigma  = \diag([\sigma_1^2, \sigma_2^2, \ldots,\sigma_p^2])\in\real^{p\times p}$ contains the squared singular values of $\bX$. When $\bX$ is nearly singular, $\sigma_p^2\approx 0$, making the inverse operation $(\bX^\top\bX)^{-1} = \bV\bS^{-1}\bV^\top$ numerically unstable. 
As a result, the solution $\widehatbbeta_{\text{LS}} =(\bX^\top\bX)^{-1}\bX^\top\by $ may diverge.
To address this issue, we typically  add an $\ell_2$ regularization term to obtain the solution for the following optimization problem:
\begin{equation}
\widehatbbeta_{\text{Tik}} = \mathop{\argmin}_{\bbeta} \normtwo{\by-\bX\bbeta}^2 +\lambda\normtwo{\bbeta}^2.
\end{equation}
This method is known as the  \textit{Tikhonov regularization method} (or simply the $\ell_2$ regularized method) \citep{tikhonov1963solution}.
The gradient of the problem is $2(\bX^\top\bX+\lambda\bI)\bbeta-2\bX^\top\by$. Thus, the least squares solution is given by 
$
\widehatbbeta_{\text{Tik}} = (\bX^\top\bX+\lambda\bI)^{-1}\bX^\top\by.
$
The inverse operation becomes $(\bX^\top\bX+\lambda\bI)^{-1} = \bV(\bS+\lambda\bI)^{-1}\bV^\top$, where $\widetildebS\triangleq(\bS+\lambda\bI)=\diag(\sigma_1^2+\lambda, \sigma_2^2+\lambda, \ldots,\sigma_p^2+\lambda)$. 
The solutions for OLS and Tikhonov regularized LS are given, respectively, by 
\begin{align}
\widehatbbeta_{\text{LS}} &= (\bX^\top\bX)^{-1}\bX^\top\by = \bV\left(\bS^{-1}\bSigma\right)\bU^\top\by;\\
\widehatbbeta_{\text{Tik}} &= (\bX^\top\bX+\lambda\bI)^{-1}\bX^\top\by = \bV\left((\bS+\lambda\bI)^{-1}\bSigma\right)\bU^\top\by,\label{equation:tik_normal_equ}
\end{align}
where the main diagonals of $\left(\bS^{-1}\bSigma\right)$ are $\diag(\frac{1}{\sigma_1}, \frac{1}{\sigma_2}, \ldots, \frac{1}{\sigma_p})$; and the main diagonals of $\left((\bS+\lambda\bI)^{-1}\bSigma\right)$ are $\diag(\frac{\sigma_1}{\sigma_1^2+\lambda}, \frac{\sigma_2}{\sigma_2^2+\lambda}, \ldots, \frac{\sigma_p}{\sigma_p^2+\lambda})$. The latter solution is more stable if $\lambda$ is greater than the   smallest nonzero squared singular value.
The condition number becomes smaller if  the smallest singular value $\sigma_p$ is close to zero:
$$
\kappa(\bX^\top\bX) = \frac{\sigma_1^2}{\sigma_p^2}
\qquad \rightarrow \qquad
\kappa(\bX^\top\bX+\lambda\bI) = \frac{\lambda+\sigma_1^2}{\lambda+\sigma_p^2}.
$$
Tikhonov regularization effectively prevents  divergence  in the least squares solution 
$\widehatbbeta_{\text{LS}} = (\bX^\top\bX)^{-1} \bX^\top \by$ when the matrix $\bX$ is nearly singular or even rank-deficient. This improvement enhances the convergence properties of both the LS algorithm and its variants, such as alternating least squares,  while addressing identifiability issues  in various settings \citep{zhang2017matrix}. As a result, Tikhonov regularization has become a widely applied technique.

\subsection{Different Least Squares Problems}

In standard linear regression, the ordinary least squares  method assumes that errors occur only in the response vector $\by$, while the data matrix $\bX$ is considered exact. However, in many real-world applications, both the data matrix $\bX$ and the response $\by$ may be subject to measurement errors.
We then discuss different forms of least squares problems.
\index{Data least squares}
\index{Total least squares}
\paragrapharrow{Data least squares.}
The least squares problem can be viewed as an optimization problem of the following form:
\begin{equation}
\widehatbbeta_{\text{LS}},\widetildeby_{\text{LS}} =\argmin_{\bbeta, \widetildeby} \normtwo{\widetildeby}^2 \quad \text{s.t. }\quad \by+\widetildeby \in \cspace(\bX),
\end{equation}
where $\widetildeby$ represents a perturbation of $\by$, i.e., a noise in the output variables.
While the OLS method accounts for errors in the response variable $\by$, the \textit{data least sqaures (DLS)} method considers errors in the predictor variables:
\begin{equation}
\widehatbbeta_{\text{DLS}}, \widetildebX_{\text{DLS}} = \mathop{\argmin}_{\bbeta, \widetildebX} \frac{1}{2}\normfbig{\widetildebX}^2 \quad \text{s.t.}\quad \by\in\cspace(\bX+\widetildebX),
\end{equation}
where $\widetildebX$ represents a perturbation of $\bX$ (i.e., a noise in the predictor variables).
That is, $(\bX+\widetildebX) \widehatbbeta_{\text{DLS}} = \by$, assuming the measured response $\by$ is noise-free.
The Lagrangian function and its gradient w.r.t. $\bbeta$ are, respectively, given by
$$
\begin{aligned}
	L(\bbeta, \widetildebX, \blambda) &= \frac{1}{2}\trace(\widetildebX\widetildebX^\top) +\blambda^\top (\bX\bbeta+\widetildebX\bbeta-\by);\\
	\nabla_{\widetildebX} L(\bbeta, \widetildebX,\blambda) &= \widetildebX+\blambda\bbeta^\top = \bzero \quad\implies\quad \widetildebX=-\blambda\bbeta^\top,
\end{aligned}
$$
where $\blambda\in\real^n$  is a vector of Lagrange multipliers.
Substituting the value of the vanishing gradient into $(\bX+\widetildebX) \bbeta = \by$ yields $\blambda = \frac{\bX\bbeta-\by}{\bbeta^\top\bbeta}$ and $\widetildebX=-\frac{(\bX\bbeta-\by)\bbeta^\top}{\bbeta^\top\bbeta} $.
Therefore, using the invariance of cyclic permutation of factors in trace,  the objective function becomes 
\begin{equation}\label{equation:dls}
\mathop{\argmin}_{\bbeta}
\frac{(\bX\bbeta-\by)^\top (\bX\bbeta-\by)}{\bbeta^\top\bbeta} .
\end{equation}

\paragrapharrow{Total least squares.} Similar to the data least squares approach, the \textit{total least squares (TLS)} method considers errors in both the predictor variables and the response variables. The TLS problem can be formulated as:
\begin{equation}\label{equation:raw_tls}
\widehatbbeta_{\text{TLS}}, \widetildebX_{\text{TLS}}, \widetildeby_{\text{TLS}} 
= \mathop{\argmin}_{\bbeta, \widetildebX, \widetildeby} \normfbig{[\widetildebX, \widetildeby]}^2, 
\quad \text{s.t.}\quad (\by+\widetildeby)\in\cspace(\bX+\widetildebX), 
\end{equation}
where $\widetilde{\bX}$ and $\widetilde{\by}$ are perturbations in the predictor variables and the response variable, respectively.
Let $\bC\triangleq[\bX,\by]\in\real^{n\times (p+1)}$,  $\bD\triangleq[\widetildebX, \widetildeby]\in\real^{n\times (p+1)}$, and $\bgamma\triangleq
\begin{bmatrixfoot}
\bbeta\\
-1
\end{bmatrixfoot}$, the problem can be equivalently stated as
\begin{equation}
\widehatbbeta_{\text{TLS}}, \widetildebX_{\text{TLS}}, \widetildeby_{\text{TLS}}  = \mathop{\argmin}_{\bgamma, \bD} \normf{\bD}^2, 
\quad \text{s.t.}\quad \bD\bgamma = -\bC\bgamma, 
\end{equation}

\paragrapharrow{Scaled total least squares.}
\citet{paige2002unifying} presents a unified framework that includes OLS, DLS, and TLS as special cases within the following optimization problem:
\begin{equation}
\mathop{\argmin}_{\bbeta, \widetildebX, \widetildeby} \normfbig{[\widetildebX, \gamma\widetildeby]}^2, 
\quad \text{s.t.}\quad (\by+\widetildeby)\in\cspace(\bX+\widetildebX), 
\end{equation}
where $\gamma$ is a given positive scaling parameter. 
For small values of $\gamma$, perturbations in $\by$ will be favored. 
In the limit as $\gamma\rightarrow 0$, the solution equals the ordinary LS solution. 
Conversely, in the limit when $\gamma\rightarrow \infty$, it reduces to the data least squares.

\subsection{Minimum Perturbation in Total Least Squares}
Writing the constraint $(\bX + \widetildebX)\bbeta = \by +\widetildeby$ as
\begin{equation}\label{equation:mnini_per_1}
[\bX + \widetildebX, \by +\widetildeby]
\begin{bmatrix}
\bbeta \\
-1
\end{bmatrix}
= \bzero
\end{equation}
shows that the matrix $[\bX + \widetildebX,  \by +\widetildeby]$ is rank-deficient and that $[\bbeta^\top,  -1]^\top$ is a right singular vector corresponding to a zero singular value: $[\bbeta^\top,  -1]^\top\in\nspace([\bX + \widetildebX,  \by +\widetildeby])$ (Theorem~\ref{theorem:svd-four-orthonormal-Basis}). The TLS problem can be analyzed using the SVD  of the augmented matrix:
\begin{equation}\label{equation:tls_aug_svd}
\bC \triangleq [\bX , \by] = \bU \bSigma \bV^\top = \sum_{i=1}^{p+1} \sigma_i \bu_i \bv_i^\top.
\end{equation}
Suppose $\bX\in\real^{n\times p}$ has full column rank.
Note that as long as the observation vector $\by$ does not lie entirely in the subspace spanned by the columns of matrix $\bX$, the augmented matrix $[\bX, \by]$ has rank $p+1$. That is, the $p+1$ columns of $[\bX, \by]$ are linearly independent. The $p+1$, $n$-dimensional columns of the matrix $[\bX, \by]$ span the $p$ dimensional space spanned by $\bX$ and a component that is normal to the subspace spanned by $\bX$.

For the solution $\bbeta$ to be \textbf{unique}, the matrix $[\bX+ \widetilde{\bX}, \; \by + \widetilde{\by}]$ must have exactly $p$ linearly independent columns. Since this matrix has $p+1$ columns in all, it must be rank-deficient by 1. Therefore, the goal of solving the minimization problem \eqref{equation:raw_tls} can be restated as the goal of finding the ``smallest'' matrix $[\widetildebX, \widetildeby]$ that changes $[\bX, \by]$ with rank $p+1$ to $[\bX, \by] + [\widetildebX, \widetildeby]$ with rank $p$. The Eckart-Young-Mirsky theorem (Theorem~\ref{theorem:young-theorem_frob}) provides the means to do so, by defining $[[\bX, \by] + [\widetildebX, \widetildeby]]$ as the ``best'' rank-$p$ approximation to $[\bX, \by]$. Dropping the smallest singular value of $[\bX, \by]$ eliminates the least amount of information from the data and ensures a unique solution (assuming $\sigma_{p+1}$ is not very close to $\sigma_p$):
\begin{equation}\label{equation:tls_rkone_pert}
\widetildebC \triangleq[\widetildebX ,\widetildeby] = -\sigma_{p+1} \bu_{p+1} \bv_{p+1}^\top,
\end{equation}
and $\min_{\widetildebX,\widetildeby} \normfbig{[\widetildebX , \widetildeby]} = \sigma_{p+1}$. Multiplying \eqref{equation:tls_rkone_pert} from the right with $\bv_{p+1}$ and using \eqref{equation:tls_aug_svd} gives
\begin{equation}\label{equation:tls_ortho_cd}
[\widetildebX, \widetildeby] \bv_{p+1} = -\sigma_{p+1} \bu_{p+1} = - [\bX , \by]\bv_{p+1}
\quad\implies\quad 
[\bX + \widetildebX ,  \by +\widetildeby]\bv_{p+1} = \bzero.
\end{equation}

Then the TLS can be categorized into two forms:
\begin{itemize}
\item \textit{Generic TLS.} $v_{p+1,p+1} \neq 0$, i.e., the $(p+1)$-th component of $\bv_{p+1} $ is nonzero. 
Then  \eqref{equation:mnini_per_1} and \eqref{equation:tls_ortho_cd} show that the TLS solution is obtained by scaling $\bv_{p+1}$ so that its last component is $-1$:
\begin{equation}
\begin{bmatrix} \widehatbbeta_{\text{TLS}} \\ -1 \end{bmatrix} = -\frac{1}{\gamma} \bv_{p+1}, \quad \text{with }\gamma \triangleq  \be_{p+1}^\top \bv_{p+1}.
\end{equation}
Finally, the ``curve-fit'' or prediction is provided by $\widehatby_{\text{TLS}} = (\bX + \widetildebX) \widehatbbeta_{\text{TLS}}$, which requires the parameters, $\widehatbbeta_{\text{TLS}}$, as well as the perturbation in the predictor variables, $\widetildebX$.
Parameter values obtained from TLS cannot be compared directly to those from OLS because the TLS solution is in terms of a different basis (here $\bX + \widetildebX$ instead of $\bX$). This last point complicates the application of TLS to curve-fitting problems in which a parameterized functional form $\widehat{y}(\bx; \widehatbbeta)$ is ultimately desired. Computing $\widehat{\by}$ directly using $\bX\widehatbbeta_{\text{TLS}}$ can give bizarre results.
\item \textit{Nongeneric TLS.} $v_{p+1,p+1} = 0$. The TLS problem fails to have a solution. Nongeneric TLS problems can be treated by adding constraints on the solution \citep{van1989analysis, van1991total, markovsky2007overview}.
\end{itemize}

\paragrapharrow{Normal equation for TLS.}
Alternatively, from the relationship between the SVD of $\bC = [\bX , \by]$ and the spectral decomposition of the symmetric matrix $\bC^\top \bC$~\footnote{If $\bC=\bU\bSigma\bV$ is the SVD of $\bC$, then $\bV\bSigma^2\bV^\top$ is the spectral decomposition of $\bC^\top\bC$; see the proof of SVD in Section~\ref{section:SVD}.}, it follows that the TLS solution $\bbeta$ can be characterized by the following normal equation:
\begin{equation}\label{equation:tls_aug_xtxxty}
\begin{bmatrix}
\bX^\top \bX & \bX^\top \by \\
\by^\top \bX & \by^\top \by
\end{bmatrix}
\bv = \sigma_{p+1}^2 \bv, \quad \bv = \begin{bmatrix}
\bbeta \\
-1
\end{bmatrix},
\end{equation}
where $\sigma_{p+1}^2$ is the smallest eigenvalue of the matrix $\bC^\top \bC$, and $\bv$ is a corresponding eigenvector. From \eqref{equation:tls_aug_xtxxty} it follows that
\begin{equation}\label{equation:tls_normequ}
(\bX^\top \bX - \sigma_{p+1}^2 \bI_p) \bbeta = \bX^\top \by
\qquad\text{and}\qquad 
\by^\top (\by - \bX\bbeta) = \sigma_{p+1}^2.
\end{equation}
In the first equation of \eqref{equation:tls_normequ}, a positive multiple of the unit matrix is subtracted from the matrix of normal equation $\bX^\top \bX \bbeta = \bX^\top \by$. This shows that TLS can be considered as a procedure for deregressionalizing the LS problem. (Compare with Tikhonov regularization, where a multiple of the unit matrix is added to improve the conditioning; see \eqref{equation:tik_normal_equ}). From a statistical point of view, TLS can be interpreted as removing bias by subtracting the error covariance matrix estimated by $\sigma_{p+1}^2 \bI$ from the data covariance matrix $\bX^\top \bX$.

Let further $\widehat{\sigma}_i$, $i = 1,2, \ldots, p$, be the singular values of $\bX$. The \textit{interlacing property of singular values} shows~\footnote{See, for example, \citet{golub2013matrix, lu2021numerical}.}:
$$
\sigma_1 \geq \widehat{\sigma}_1 \geq \ldots \geq \sigma_p \geq \widehat{\sigma}_p \geq \sigma_{p+1}.
$$
The condition $\widehat{\sigma}_p > \sigma_{p+1}$ ensures that $\bX^\top \bX - \sigma_{p+1}^2 \bI$ is symmetric positive definite  by the eigenvalue characterization theorem (Theorem~\ref{theorem:eigen_charac}) and that the TLS problem has a \textbf{unique} solution.

\index{ALS}
\index{Alternating least squares}
\index{Netflix}
\section{Alternating Least Squares (ALS)}\label{section:als-netflix}
The explosion of data from advancements in sensor technology and computer hardware poses new challenges for data analysis. 
The substantial  volume of data often contains noise and other distortions, requiring pre-processing for the application of deductive science. 
For instance, signals received by antenna arrays often are contaminated by noise and other degradations. 
Effectively analyzing such data requires reconstruction or representation in a manner that minimizes inaccuracies while maintaining certain feasibility conditions.

Moreover, data collected from complex systems often arises from multiple interrelated variables acting together. 
When these variables lack clear definitions, the information contained in the original data may be overlapping and unclear. By creating a reduced system model, we can achieve a level of accuracy that is close to the original system. 
The standard approach involves removing noise, reducing the model, and reconstructing feasibility by replacing the original data with a lower-dimensional representation obtained through subspace approximation.
Consequently, low-rank approximations or low-rank matrix decompositions play a important role in 
a wide range of applications.

Low-rank matrix decomposition stands out as a potent technique in machine learning and data mining for representing a given matrix as the product of two or more matrices with lower dimensions.
This method captures the essential structure of a matrix while disregarding noise and redundancies.  
Common techniques for low-rank matrix decomposition  include singular value decomposition (SVD), principal component analysis (PCA),  multiplicative update nonnegative matrix factorization (NMF), and the alternating least squares (ALS) approach introduced in this section.

For example, in the Netflix Prize competition \citep{bennett2007netflix}, the objective is to predict the users' ratings for different movies based on their existing ratings  for other movies.
We use indices $n= 1, 2,\ldots,N$ for $N$ movies and $p = 1, 2,\ldots,P$ for $P$ users. 
The rating of the $p$-th user for the $n$-th movie is denoted by $x_{np}$. 
Let $\bX$ be an $N \times P$ \footnote{For the purpose of this section, we temporarily assume that the matrix $\bX$ has dimensions $N \times P$; otherwise, it will be treated as $n \times p$.} rating matrix with columns $\bx_p \in \real^N$ containing ratings provided by the $p$-th user. Note that numerous  ratings $\{x_{np}\}$ are missing, and our objective is to accurately predict these absent  ratings.

We formally consider algorithms for solving the following problem: Approximating the matrix $\bX$ through factorization into an  $N\times K$ matrix $\bW$ and a $K \times  P$ matrix $\bZ$. 
Typically, $K$ is chosen to be smaller than both $N$ and $P$, ensuring reduced dimensions for $\bW$ and $\bZ$  compared to the original  matrix $\bX$. 
This dimensional reduction yields a compressed version of the original data matrix. 
Deciding the appropriate value for $K$ is crucial in practice, and its selection is often problem-dependent.

The factorization holds significance; let $\bX=[\bx_1, \bx_2, \ldots, \bx_P]$ and $\bZ=[\bz_1, \bz_2, \ldots, \bz_P]$ be the column partitions of $\bX$ and $\bZ$, respectively. 
Then,  $\bx_p = \bW\bz_p$, implying that each column $\bx_p$ is approximated by a linear combination of the columns of $\bW$, weighted by the components in $\bz_p$. 
Thus, the columns of $\bW$ can be viewed as containing the column basis of $\bX$.

To achieve the approximation  $\bX\approx\bW\bZ$, a suitable loss function must be established for measuring the distance between $\bX$ and $\bW\bZ$. 
In this context, we opt for the Frobenius norm (Definition~\ref{definition:frobernius-in-svd}) between two matrices, which vanishes to zero if $\bX=\bW\bZ$, and the advantage will be evident shortly.

To simplify the problem, let's first assume  the absence of missing ratings. 
We project data vectors $\bx_p\in\real^N$ into a lower dimension $\bz_p \in \real^K$  with $K<\min\{N, P\}$
in a manner that minimizes the \textit{reconstruction error}, as measured by the Frobenius norm (assuming $K$ is known):
\begin{equation}\label{equation:als-per-example-loss}
\mathop{\min}_{\bW,\bZ}  \sum_{p=1}^P \sum_{n=1}^{N} \left(x_{np} - \bw_n^\top\bz_p\right)^2,
\end{equation}
where $\bW=[\bw_1^\top; \bw_2^\top; \ldots; \bw_N^\top]\in \real^{N\times K}$ and $\bZ=[\bz_1, \bz_2, \ldots, \bz_P] \in \real^{K\times P}$ contain $\bw_n$'s and $\bz_p$'s as \textbf{rows and columns}, respectively. The loss formulation in \eqref{equation:als-per-example-loss} is referred to as the \textit{per-example loss}. 
It can be equivalently expressed as
$$
L(\bW,\bZ) = \sum_{p=1}^P \sum_{n=1}^{N} \left(x_{np} - \bw_n^\top\bz_p\right)^2 
= \normf{\bW\bZ-\bX}^2.
$$ 
Furthermore, the loss function $L(\bW,\bZ)=\sum_{p=1}^P \sum_{n=1}^{N} \left(x_{np} - \bw_n^\top\bz_p\right)$ is convex concerning $\bZ$ given $\bW$, and vice versa. Therefore, we can first minimize it with respect to $\bZ$
while keeping $\bW$ fixed, and subsequently minimize it with respect to $\bW$ with $\bZ$ fixed.
This results in  two optimization problems, denoted by ALS1 and ALS2:
$$
\left\{
\begin{aligned}
\bZ &\leftarrow \mathop{\arg \min}_{\bZ} L(\bW,\bZ);    \qquad \text{(ALS1)} \\ 
\bW &\leftarrow \mathop{\arg \min}_{\bW} L(\bW,\bZ). \qquad \text{(ALS2)}
\end{aligned}
\right.
$$
This is referred to as the \textit{coordinate descent algorithm}, wherein  we alternate between optimizing the least squares concerning $\bW$ and $\bZ$. 
Therefore, it is also called the \textit{alternating least squares (ALS)} algorithm \citep{comon2009tensor, takacs2012alternating, giampouras2018alternating}. 
The convergence is guaranteed if the loss function $L(\bW,\bZ)$ decreases at each iteration.

\index{Coordinate descent algorithm}
\index{Convexity}
\index{Global minimum}
\index{ALS}
\begin{remark}[Convexity and global minimum]
While the loss function defined by Frobenius norm $\normf{\bW\bZ-\bX}^2$ is convex either with respect to  $\bW$ when $\bZ$ is fixed or vice versa, it lacks joint convexity in both variables simultaneously. Consequently, identifying the global minimum is infeasible. 
Nevertheless, the convergence is guaranteed to reach local minima.
\end{remark}

\subsection*{Given $\bW$, Optimizing $\bZ$}

Let's now explore the problem of $\bZ \leftarrow \mathop{\arg \min}_{\bZ} L(\bW,\bZ)$. 
With $\bW$ fixed, we can represent $L(\bW,\bZ)$  as $L(\bZ\mid \bW)$ (or more concisely, as $L(\bZ)$) to emphasize  the variable of $\bZ$:
$$
\begin{aligned}
L(\bZ\mid \bW) &= \normf{\bW\bZ-\bX}^2= \normtwobig{\bW[\bz_1,\bz_2,\ldots, \bz_P]-[\bx_1,\bx_2,\ldots,\bx_P]}^2=
\normtwo{\begin{bmatrixfoot}
		\bW\bz_1 - \bx_1 \\
		\bW\bz_2 - \bx_2\\
		\vdots \\
		\bW\bz_P - \bx_P
\end{bmatrixfoot}}^2. 
\end{aligned}
$$
Now, if we define 
$$
\widetildebW 
\triangleq 
\begin{bmatrixfoot}
\bW & \bzero & \ldots & \bzero\\
\bzero & \bW & \ldots & \bzero\\
\vdots & \vdots & \ddots & \vdots \\
\bzero & \bzero & \ldots & \bW
\end{bmatrixfoot}
\in \real^{NP\times KP}, 
\gap 
\widetildebz
\triangleq
\begin{bmatrixfoot}
\bz_1 \\ \bz_2 \\ \vdots \\ \bz_P
\end{bmatrixfoot}
\in \real^{KP},
\gap 
\widetildeba
\triangleq
\begin{bmatrixfoot}
\bx_1 \\ \bx_2 \\ \vdots \\ \bx_P
\end{bmatrixfoot}
\in \real^{NP},
$$
then the (ALS1) problem an be equivalently transformed into the ordinary least squares problem, aiming to minimize $\normtwobig{\widetildebW \widetildebz - \widetildeba}^2$ concerning $\widetildebz$. The solution is then given by 
$$
\widetildebz = (\widetildebW^\top\widetildebW)^{-1} \widetildebW^\top\widetildeba.
$$
However, it is not advisable  to employ this approach for obtaining the result, as computing the inverse of  $\widetildebW^\top\widetildebW$ requires $2(KP)^3$ flops \citep{lu2021numerical}.
Instead, a more direct method to solve the (ALS1) problem is to determine the gradient of $L(\bZ\mid \bW)$ concerning $\bZ$ (assuming all the partial derivatives of this function exist):
\begin{equation}\label{equation:givenw-update-z-allgd}
\begin{aligned}
\nabla L(\bZ\mid \bW) &= 
\frac{\partial \,\,\trace\left((\bW\bZ-\bX)(\bW\bZ-\bX)^\top\right)}{\partial \bZ}
= 2  \bW^\top(\bW\bZ-\bX) \in \real^{K\times P},
\end{aligned}
\end{equation}
When the loss function is a differentiable function of $\bZ$, we can determine the least squares solution using differential calculus. And a minimum of the function 
$L(\bZ\mid \bW)$ must be a root of the equation (Proposition~\ref{proposition:fermat_fist_opt}):
$$
\nabla L(\bZ\mid \bW)  = \bzero.
$$
By solving the equation above, we derive the ``candidate" update for $\bZ$, which corresponds to the minimizer of $L(\bZ\mid \bW)$:
\begin{equation}\label{equation:als-z-update}
{\bZ = (\bW^\top\bW)^{-1} \bW^\top \bX  \leftarrow \mathop{\arg \min}_{\bZ} L(\bZ\mid \bW).}
\end{equation}
This requires $2K^3$ flops to compute the inverse of $\bW^\top\bW$, a notable improvement compared to $2(KP)^3$ flops to get the inverse of $\widetildebW^\top\widetildebW$ \citep{lu2021numerical}.
Before we declare a root of the  equation above is actually a minimizer rather than a maximizer (that's why we call the update a ``candidate" update), we need to verify the function is convex. 
In the case where  the function is twice differentiable, this confirmation   can be equivalently achieved by verifying (see Problem~\ref{problem:pos_hessian}): 
$$
\nabla^2 L(\bZ\mid \bW) \succ \bzero.~
\footnote{In short, a twice continuously differentiable function $f$ over an open convex set $\sS$ is called \textit{convex} if and only if $\nabla^2f(\bbeta)\geq \bzero $ for any $\bbeta\in \sS$ (sufficient and necessary for convex); and called \textit{strictly convex} if $\nabla^2f(\bbeta)> \bzero$ for any $\bbeta\in \sS$ (only sufficient for strictly convex, e.g., $f(\beta)=\beta^6$ is strictly convex, but $f^{\prime\prime}(\beta)=30\beta^4$ is equal to zero at $\beta=0$.). 
And when the convex function $f$ is a continuously differentiable function over a convex set $\sS$, the stationary point $\nabla f(\bbeta^\star)=\bzero$ of $\bbeta^\star\in\sS$ is  a \textit{global minimizer} of $f$ over $\sS$.
In our context, when given $\bW$ and updating $\bZ$, the function is defined over the entire space $\real^{K\times P}$.
}
$$
That is, the Hessian matrix is positive definite. To see this, we explicitly express the Hessian matrix as
\begin{equation}\label{equation:als-z-update_hessian}
\nabla^2 L(\bZ\mid \bW)= 2\widetildebW^\top\widetildebW \in \real^{KP\times KP},
\end{equation}
which has full rank if $\bW\in \real^{N\times K}$ has full rank  and $K<N$ (Lemma~\ref{lemma:rank-of-ata-x}).

\begin{remark}[Positive definite Hessian if $\bW$ has full rank]
We assert that if $\bW\in\real^{N\times K}$ has full rank $K$ with $K<N$, then $\nabla^2 L(\bZ\mid \bW)$ is positive definite. This assertion is supported by verifying that when $\bW$ has full rank, the equation $\bW\bbeta=\bzero$ only holds true when $\bbeta=\bzero$, since the null space of $\bW$ is of dimension 0. Therefore, 
$$
\bbeta^\top (2\bW^\top\bW)\bbeta >0, \qquad \text{for any nonzero vector $\bbeta\in \real^K$}.
$$ 
\end{remark}
The challenge now is to confirm \textcolor{black}{\textbf{whether $\bW$ possesses full rank, ensuring the positive definiteness of the Hessian of $L(\bZ\mid \bW)$}}; otherwise, we cannot claim the update of $\bZ$ in Equation~\eqref{equation:als-z-update} reduces the loss (due to convexity), thereby enhancing the matrix decomposition's approximation of the original matrix $\bX$ through $\bW\bZ$ in each iteration. 
We will shortly come back to the positive definiteness of the Hessian matrix in the sequel, relying on the following lemma.
\begin{lemma}[Rank of $\bZ$ after updating]\label{lemma:als-update-z-rank}
Suppose $\bX\in \real^{N\times P}$ has full rank with \textcolor{mylightbluetext}{$N\leq P$} and $\bW\in \real^{N\times K}$ has full rank with $K<N$ (i.e., $K<N\leq P$).
Then the update of $\bZ=(\bW^\top\bW)^{-1} \bW^\top \bX \in \real^{K\times P}$ in Equation~\eqref{equation:als-z-update} has full rank.
\end{lemma}
\begin{proof}[of Lemma~\ref{lemma:als-update-z-rank}]
Since $\bW^\top\bW\in \real^{K\times K}$ has full rank if $\bW$ has full rank (Lemma~\ref{lemma:rank-of-ata-x}), it follows that $(\bW^\top\bW)^{-1} $ has full rank. 

Suppose $\bW^\top\bbeta=\bzero$, it implies that $(\bW^\top\bW)^{-1} \bW^\top\bbeta=\bzero$. Thus, the following two null spaces satisfy:
$$
\nspace(\bW^\top) \subseteq \nspace\left((\bW^\top\bW)^{-1} \bW^\top\right).
$$
Furthermore, suppose $\bbeta$ is in the null space of $(\bW^\top\bW)^{-1} \bW^\top$ such that $(\bW^\top\bW)^{-1} \bW^\top\bbeta=\bzero$. And since $(\bW^\top\bW)^{-1} $ is invertible, this implies $ \bW^\top\bbeta=(\bW^\top\bW)\bzero=\bzero$, and 
$$
\nspace\left((\bW^\top\bW)^{-1} \bW^\top\right)\subseteq \nspace(\bW^\top).
$$
Combining the two results yields that 
\begin{equation}\label{equation:als-z-sandiwch1}
\nspace(\bW^\top) = \nspace\left((\bW^\top\bW)^{-1} \bW^\top\right).
\end{equation}
Hence, $(\bW^\top\bW)^{-1} \bW^\top$ has full rank $K$. Let $\bT\triangleq (\bW^\top\bW)^{-1} \bW^\top\in \real^{K\times N}$, and suppose $\bT^\top\bbeta=\bzero$. This implies $\bX^\top\bT^\top\bbeta=\bzero$, and 
$$
\nspace(\bT^\top) \subseteq \nspace(\bX^\top\bT^\top).
$$
Similarly, suppose $\bX^\top(\bT^\top\bbeta)=\bzero$. Since $\bX$ has full rank with the dimension of the null space being 0: $\dim\left(\nspace(\bX^\top)\right)=0$, $(\bT^\top\bbeta)$ must be zero. The claim follows  since $\bX$ has full rank $N$ with the row space of $\bX^\top$ being equal to the column space of $\bX$, where $\dim\left(\cspace(\bX)\right)=N$ and the $\dim\left(\nspace(\bX^\top)\right) = N-\dim\left(\cspace(\bX)\right)=0$. Therefore, $\bbeta$ is in the null space of $\bT^\top$ if $\bbeta$ is in the null space of $\bX^\top\bT^\top$:
$$
\nspace(\bX^\top\bT^\top)\subseteq \nspace(\bT^\top).
$$
By ``sandwiching" again, 
\begin{equation}\label{equation:als-z-sandiwch2}
\nspace(\bT^\top) = \nspace(\bX^\top\bT^\top).
\end{equation}
Since $\bT^\top$ has full rank $K<N\leq P$, it follows that $\dim\left(\nspace(\bT^\top) \right) = \dim\left(\nspace(\bX^\top\bT^\top)\right)=0$.
Therefore,
$\bZ^\top=\bX^\top\bT^\top$ has full rank $K$.
We complete the proof.
\end{proof}

\subsection*{Given $\bZ$, Optimizing $\bW$}

Similarly, given $\bZ$ fixed, express $L(\bW,\bZ)$ as $L(\bW\mid \bZ)$ (or more concisely, as $L(\bW)$) to emphasize the variable of $\bW$:
$
\begin{aligned}
L(\bW\mid \bZ) &= \normf{\bW\bZ-\bX}^2.
\end{aligned}
$
A direct approach to solve the optimization of (ALS2) involves finding the gradient of $L(\bW\mid \bZ)$ with respect to $\bW$:
$$
\begin{aligned}
\nabla L(\bW\mid \bZ) &= 
\frac{\partial \,\,\trace\left((\bW\bZ-\bX)(\bW\bZ-\bX)^\top\right)}{\partial \bW}
=  2(\bW\bZ-\bX)\bZ^\top \in \real^{N\times K}.
\end{aligned}
$$
Similarly, the ``candidate" update for  $\bW$ can be obtained by locating the root of the gradient $\nabla L(\bW\mid \bZ)$:
\begin{equation}\label{equation:als-w-update}
{\bW^\top = (\bZ\bZ^\top)^{-1}\bZ\bX^\top  \leftarrow \mathop{\arg\min}_{\bW} L(\bW\mid \bZ).}
\end{equation}
Again, it is important to highlight that the provided update is merely a ``candidate" update. 
Further verification is required to determine whether the Hessian is positive definite or not. 
The Hessian matrix is expressed as follows:
\begin{equation}\label{equation:als-w-update_hessian}
\begin{aligned}
\nabla^2 L(\bW\mid \bZ) =2\widetildebZ\widetildebZ^\top \in \real^{KN\times KN}.
\end{aligned}
\end{equation}
Therefore, by analogous analysis, if $\bZ$ has full rank with $K<P$, the Hessian matrix is positive definite.

\begin{lemma}[Rank of $\bW$ after updating]\label{lemma:als-update-w-rank}
Suppose $\bX\in \real^{N\times P}$ has full rank with \textcolor{mylightbluetext}{$N\geq P$} and $\bZ\in \real^{K\times P}$ has full rank with $K<P$ (i.e., $K<P\leq N$). 
Then the update of $\bW^\top = (\bZ\bZ^\top)^{-1}\bZ\bX^\top$ in Equation~\eqref{equation:als-w-update} has full rank.
\end{lemma}
The proof of Lemma~\ref{lemma:als-update-w-rank} is similar to that of  Lemma~\ref{lemma:als-update-z-rank}, and we shall not repeat the details.

\paragrapharrow{Key observation.}
Combining the observations in Lemma~\ref{lemma:als-update-z-rank} and Lemma~\ref{lemma:als-update-w-rank}, as long as we \textcolor{mylightbluetext}{initialize $\bZ$ and $\bW$ to have full rank}, the updates in Equations~\eqref{equation:als-z-update} and \eqref{equation:als-w-update} are reasonable \textbf{since the Hessians in Equations~\eqref{equation:als-z-update_hessian} and \eqref{equation:als-w-update_hessian} are positive definite}. 
{
Note that we need an additional condition to satisfy  
both Lemma~\ref{lemma:als-update-z-rank} 
and Lemma~\ref{lemma:als-update-w-rank}: $N=P$, i.e., there must be an equal number of movies and   users.
We will relax this condition  through regularization.
} 
We summarize the process in Algorithm~\ref{alg:als}.

\begin{algorithm}[htp] 
\caption{Alternating Least Squares}
\label{alg:als}
\begin{algorithmic}[1] 
\Require Matrix $\bX\in \real^{N\times P}$ \textcolor{mylightbluetext}{with $N= P$};
\State Initialize $\bW\in \real^{N\times K}$, $\bZ\in \real^{K\times P}$ \textcolor{mylightbluetext}{with full rank and $K<N= P$}; 
\State Choose a stop criterion on the approximation error $\delta$;
\State Choose the maximal number of iterations $C$;
\State $iter=0$; \Comment{Count for the number of iterations}
\While{$\normf{\bX-\bW\bZ}>\delta $ and $iter<C$} 
\State $iter=iter+1$;
\State $\bZ = (\bW^\top\bW)^{-1} \bW^\top \bX  \leftarrow \mathop{\arg \min}_{\bZ} L(\bZ\mid \bW)$;
\State $\bW^\top = (\bZ\bZ^\top)^{-1}\bZ\bX^\top  \leftarrow \mathop{\arg\min}_{\bW} L(\bW\mid \bZ)$;
\EndWhile
\State Output $\bW,\bZ$;
\end{algorithmic} 
\end{algorithm}

\index{Regularization}
\subsection*{Regularization: Extension to General Matrices}\label{section:regularization-extention-general}

\textit{Tikhonov regularization} or simply \textit{regularization} is a machine learning technique employed to prevent overfitting and enhance model generalization; see Section~\ref{section:totalles_otheriss}. Overfitting occurs when a model becomes excessively complex, closely fitting the training data but performing poorly on unseen data.
To address this issue, regularization introduces a constraint or penalty term into the loss function used for model optimization. This discourages the development of overly complex models, striking a balance between model simplicity and effective training data fitting.
Common types of regularization include $\ell_1$ regularization, $\ell_2$ regularization, and elastic net regularization (a combination of $\ell_1$ and $\ell_2$ regularization). 
Regularization finds extensive application in machine learning algorithms such as linear regression, logistic regression, and neural networks.

In the context of the alternating least squares problem, we can introduce a $\ell_2$ regularization term  to minimize the following loss:
\begin{equation}\label{equation:als-regularion-full-matrix}
L(\bW,\bZ)  =\normf{\bW\bZ-\bX}^2 +\lambda_w \normf{\bW}^2 + \lambda_z \normf{\bZ}^2, \qquad \lambda_w>0, \lambda_z>0,
\end{equation}
where the gradient with respect to $\bZ$ and $\bW$ are given respectively by 
\begin{equation}\label{equation:als-regulari-gradien}
\left\{
\begin{aligned}
\nabla L(\bZ\mid \bW) &= 2\bW^\top(\bW\bZ-\bX) + 2\lambda_z\bZ \in \real^{K\times P};\\
\nabla L(\bW\mid \bZ)  &= 2(\bW\bZ-\bX)\bZ^\top + 2\lambda_w\bW \in \real^{N\times K}.
\end{aligned}
\right.
\end{equation}
The Hessian matrices become:
\begin{equation}\label{equation:als_hess_reg}
\left\{
\begin{aligned}
\nabla^2 L(\bZ\mid \bW) &= 2\widetildebW^\top\widetildebW+ 2\lambda_z\bI \in \real^{KP\times KP};\\
\nabla^2 L(\bW\mid \bZ)  &= 2\widetildebZ\widetildebZ^\top + 2\lambda_w\bI \in \real^{KN\times KN}, \\
\end{aligned}
\right.
\end{equation}
which are positive definite due to the perturbation introduced by the regularization. 

\textbf{The regularization ensues that the Hessian matrices become positive definite, even if $\bW$ and $\bZ$ are rank-deficient}. 
Consequently, matrix decomposition can be extended to any matrix, irrespective of  whether $N>P$ or $N<P$. In rare cases, $K$ can be chosen as $K>\max\{N, P\}$ to obtain a \textit{high-rank approximation} of $\bX$. 
However, in most scenarios, we want to find the \textit{low-rank approximation} of $\bX$ with $K<\min\{N, P\}$. For instance,  ALS can be utilized to find  low-rank neural networks or transformer structures, reducing the memory usage of neural networks while enhancing performance \citep{lu2021numerical, lu2025large}.
Therefore, the minimizers can be determined by identifying the roots of the gradient:
\begin{equation}\label{equation:als-regular-final-all}
\begin{aligned}
\bZ &= (\bW^\top\bW+ \lambda_z\bI)^{-1} \bW^\top \bX 
\qquad\text{and}\qquad 
\bW^\top = (\bZ\bZ^\top+\lambda_w\bI)^{-1}\bZ\bX^\top .
\end{aligned}
\end{equation}
The regularization parameters $\lambda_z, \lambda_w\in \real$ are used to balance the trade-off
between the accuracy of the approximation and the smoothness of the computed solution. The selection of these parameters is typically problem-dependent and can be obtained through \textit{cross-validation}. Again, we summarize the process in Algorithm~\ref{alg:als-regularizer}.

\index{Cross-validation}
\begin{algorithm}[h] 
\caption{Alternating Least Squares with Regularization}
\label{alg:als-regularizer}
\begin{algorithmic}[1] 
\Require Matrix $\bX\in \real^{N\times P}$;
\State Initialize $\bW\in \real^{N\times K}$, $\bZ\in \real^{K\times P}$ \textcolor{mylightbluetext}{randomly without condition on the rank and the relationship between $N, P, K$}; 
\State Choose a stop criterion on the approximation error $\delta$;
\State Choose regularization parameters $\lambda_w, \lambda_z$;
\State Choose the  maximal number of iterations $C$;
\State $iter=0$; \Comment{Count for the number of iterations}
\While{$\normf{\bX-\bW\bZ}>\delta $ and $iter<C$}
\State $iter=iter+1$; 
\State $\bZ = (\bW^\top\bW+ \lambda_z\bI)^{-1} \bW^\top \bX  \leftarrow \mathop{\arg \min}_{\bZ} L(\bZ\mid \bW)$;
\State $\bW^\top = (\bZ\bZ^\top+\lambda_w\bI)^{-1}\bZ\bX^\top  \leftarrow \mathop{\arg\min}_{\bW} L(\bW\mid \bZ)$;
\EndWhile
\State Output $\bW,\bZ$;
\end{algorithmic} 
\end{algorithm}

\index{Missing entries}
\index{Hadamard product}
\index{Netflix recommender}
\subsection{Missing Entries and Rank-One Update}\label{section:alt-columb-by-column}
Since  matrix decomposition via  ALS is extensively used in the context of Netflix recommender data, where a substantial number of entries are missing due to users not having watched certain movies or choosing not to rate them for various reasons.
In this scenario, the low-rank matrix decomposition problem is also known as \textit{matrix completion} that can help recover unobserved entries \citep{jain2017non}.
To address this, we can introduce an additional mask matrix $\bM\in \{0,1\}^{N\times P}$, where $m_{np}\in \{0,1\}$ indicates whether  user $p$ has rated  movie $n$ or not. Therefore, the loss function can be defined as 
$$
L(\bW,\bZ) = \frac{1}{2}\normf{\bM\hadaprod  \bX- \bM\hadaprod (\bW\bZ)}^2,
$$
where $\hadaprod$ represents the \textit{Hadamard product} between matrices. 
The above formulation concisely expresses our goal of finding a completion of the ratings matrix that is both of low rank and consistent with observed user ratings.
To find the solution to this problem, we decompose the updates in Equation~\eqref{equation:als-regular-final-all} into:
\begin{equation}\label{equation:als-ori-all-wz}
\left\{
\begin{aligned}
\bz_p &= (\bW^\top\bW+ \lambda_z\bI)^{-1} \bW^\top \bx_p, &\gap& \text{for $p\in \{1,2,\ldots, P\}$}  ;\\
\bw_n &= (\bZ\bZ^\top+\lambda_w\bI)^{-1}\bZ\bb_n,  &\gap& \text{for $n\in \{1,2,\ldots, N\}$} ,
\end{aligned}
\right.
\end{equation}
where $\bZ=[\bz_1, \bz_2, \ldots, \bz_P]$ and $\bX=[\bx_1,\bx_2, \ldots, \bx_P]$ represent the column partitions of $\bZ$ and $\bX$, respectively. Similarly, $\bW^\top=[\bw_1, \bw_2, \ldots, \bw_N]$ and $\bX^\top=[\bb_1,\bb_2, \ldots, \bb_N]$ are the column partitions of $\bW^\top$ and $\bX^\top$, respectively. This decomposition of the updates indicates the updates can be performed in a column-by-column fashion (the rank-one updates).

\paragrapharrow{Given $\bW$.}
Let $\bo_p\in \{0,1\}^N$ represent the movies rated by user $p$, where $o_{pn}=1$ if user $p$ has rated movie $n$, and $o_{pn}=0$ otherwise. Then the $p$-th column of $\bX$ without missing entries can be denoted using the Matlab-style notation as $\bx_p[\bo_p]$. 
And we want to approximate the existing entries of the $p$-th column by $\bx_p[\bo_p] \approx \bW[\bo_p, :]\bz_p$, which is indeed a rank-one least squares problem:
\begin{equation}\label{equation:als-ori-all-wz-modif-z}
\begin{aligned}
\bz_p &= \left(\bW[\bo_p, :]^\top\bW[\bo_p, :]+ \lambda_z\bI\right)^{-1} \bW[\bo_p, :]^\top \bx_p[\bo_p], \quad \text{for $p\in \{1,2,\ldots, P\}$} .
\end{aligned}
\end{equation}
Moreover, the loss function with respect to $\bz_p$ and $\bZ$  can be described, respectively, by
$$
\begin{aligned}
L(\bz_p|\bW) &=\sum_{n\in \bo_p} \left(x_{np} - \bw_n^\top\bz_p\right)^2
\gap \text{and}\gap
L(\bZ|\bW) =\sum_{p=1}^P\ \sum_{n\in \bo_p} \left(x_{np} - \bw_n^\top\bz_p\right)^2.
\end{aligned}
$$

\paragrapharrow{Given $\bZ$.}
Similarly, if $\bp_n \in\{0,1\}^{P}$ denotes the users who have rated  movie $n$, with $p_{np}=1$ if  movie $n$ has been rated by user $p$, and $p_{np}=0$ otherwise. Then the $n$-th row of $\bX$ without missing entries can be denoted by the Matlab-style notation as $\bb_n[\bp_n]$. We want to approximate the existing entries of the $n$-th row by $\bb_n[\bp_n] \approx \bZ[:, \bp_n]^\top\bw_n$, 
\footnote{Note that $\bZ[:, \bp_n]^\top$ is the transpose of $\bZ[:, \bp_n]$, which is equal to $\bZ^\top[\bp_n,:]$, i.e., transposing first and then selecting.}
which  is again a rank-one least squares problem:
\begin{equation}\label{equation:als-ori-all-wz-modif-w}
\begin{aligned}
\bw_n &= (\bZ[:, \bp_n]\bZ[:, \bp_n]^\top+\lambda_w\bI)^{-1}\bZ[:, \bp_n]\bb_n[\bp_n],  \quad \text{for $n\in \{1,2,\ldots, N\}$} .
\end{aligned}
\end{equation}
Similarly, the loss function with respect to $\bw_n$ and $\bW$ can be described, respectively,  by
$$
\begin{aligned}
L(\bw_n|\bZ) &=\sum_{p\in \bp_n} \left(x_{np} - \bw_n^\top\bz_p\right)^2 
\gap \text{and}\gap
L(\bW|\bZ) =\sum_{n=1}^N  \sum_{p\in \bp_n} \left(x_{np} - \bw_n^\top\bz_p\right)^2 .
\end{aligned}
$$
The procedure is once again presented in Algorithm~\ref{alg:als-regularizer-missing-entries}.
Other approaches, such as \textit{singular value projection (SVP)}, also exist to address the matrix completion problem. At a high level, SVP is a type of projected gradient descent (PGD) method that updates iteratively via gradient descent, projecting the updated matrix into a low-rank form through singular value decomposition at each step. However, the alternating least squares approach generally  outperforms SVP in the context of matrix completion, so we will not delve into SVP here. For more details, refer to \citet{jain2017non}.

\begin{algorithm}[h] 
\caption{Alternating Least Squares with Missing Entries and Regularization}
\label{alg:als-regularizer-missing-entries}
\begin{algorithmic}[1] 
\Require Matrix $\bX\in \real^{N\times P}$;
\State Initialize $\bW\in \real^{N\times K}$, $\bZ\in \real^{K\times P}$ \textcolor{mylightbluetext}{randomly without condition on the rank and the relationship between $N, P, K$}; 
\State Choose a stoping criterion on the approximation error $\delta$;
\State Choose regularization parameters $\lambda_w, \lambda_z$;
\State Compute the mask matrix $\bM$ from $\bX$;
\State Choose the maximum number of iterations $C$;
\State $iter=0$; \Comment{Count for the number of iterations}
\While{\textcolor{mylightbluetext}{$\normf{\bM\hadaprod  \bX- \bM\hadaprod (\bW\bZ)}^2>\delta $} and $iter<C$}
\State $iter=iter+1$; 
\For{$p=1,2,\ldots, P$}
\State $\bz_p \leftarrow \left(\bW[\bo_p, :]^\top\bW[\bo_p, :]+ \lambda_z\bI\right)^{-1} \bW[\bo_p, :]^\top \bx_p[\bo_p]$; \Comment{$p$-th column of $\bZ$}
\EndFor

\For{$n=1,2,\ldots, N$}
\State $\bw_n \leftarrow (\bZ[:, \bp_n]\bZ[:, \bp_n]^\top+\lambda_w\bI)^{-1}\bZ[:, \bp_n]\bb_n[\bp_n]$;\Comment{$n$-th column of $\bW^\top$}
\EndFor
\EndWhile
\State Output $\bW^\top=[\bw_1, \bw_2, \ldots, \bw_N],\bZ=[\bz_1, \bz_2, \ldots, \bz_P]$;
\end{algorithmic} 
\end{algorithm}

\index{Hidden features}
\index{Inner product}
\subsection{Vector Inner Product and Hidden Vectors}\label{section:als-vector-product}
We have observed that the ALS algorithm seeks to find lower-dimensional matrices $\bW$ and $\bZ$ such that their product $\bW\bZ$ can approximate $\bX\approx \bW\bZ$ in terms of the  squared loss:
$
\mathop{\min}_{\bW,\bZ}  \sum_{p=1}^P \sum_{n=1}^{N} \left(x_{np} - \bw_n^\top\bz_p\right)^2,
$
that is, each entry $x_{np}$ in $\bX$ can be approximated as the inner product of  two vectors $\bw_n^\top\bz_p$. The geometric definition of the vector inner product is given by 
$$
\bw_n^\top\bz_p = \normtwo{\bw_n}\cdot \normtwo{\bz_p} \cos \theta,
$$
where $\theta$ represents the angle between vectors $\bw_n$ and $\bz_p$. Thus, if the vector norms of $\bw_n$ and $\bz_p$ are determined, the smaller the angle, the larger the inner product.

In the context of Netflix data,   movie ratings range from 0 to 5,
with higher ratings indicating a stronger user preference for the movie. 
If $\bw_n$ and $\bz_p$ fall sufficiently ``close," the value $\bw_n^\top\bz_p$ becomes larger. 
This concept elucidates the essence of ALS, where $\bw_n$ represents the features or attributes of movie $n$, while $\bz_p$ encapsulates  the features or preferences of user $p$. 
In other words,  ALS associates each user with a \textit{latent vector of preference} and each movie with a \textit{latent vector of attributes}.
Furthermore, each element in $\bw_n$ and $\bz_p$ signifies a specific feature. For example, it could be that the second feature $w_{n2}$ ($w_{n2}$ denotes the second element of vector $\bw_n$) represents whether the movie is an action movie or not, and $z_{p2}$ might denote whether  user $p$ has a preference for action movies. When this holds true, then $\bw_n^\top\bz_p$ becomes large and provides a good approximation of  $x_{np}$.

In the decomposition $\bX\approx \bW\bZ$, it is established that the rows of $\bW$ contain the hidden features of the movies, and the columns of $\bZ$ contain the hidden features of the users. 
Nevertheless, the explicit meanings of the rows in $\bW$ or the columns in $\bZ$ remain undisclosed.
Although they might correspond to categories or genres of the movies, fostering underlying connections between users and movies, their precise nature remains uncertain.
It is precisely this ambiguity that gives rise to the terminology ``latent" or ``hidden."

\begin{problemset}
\item Prove Corollary~\ref{corollary:ls_equiv} rigorously.

\item Prove Theorems~\ref{theorem:jensens_ineq} and \ref{theorem:conv_gradient_ineq} for convex functions.

\item Prove \eqref{equation:inv_aug_mat}.


\item Determine all the minimizers  in Corollary~\ref{corollary:matrix_ls}.

\item \label{prob:inner_zprod} Given a positive definite matrix $\bZ$, show that the inner productg $\innerproduct{\bx,\by}_{\bZ} = \bx^\top\bZ\by$ for all $\bx,\by\in\real^n$ is a valid inner product satisfying Definition~\ref{definition:inner_prod}.

\item \textbf{Constrained (Regularized) least squares (CLS).} 
Given $\bX\in\real^{n\times p}, \by\in\real^{n}, \bY\in\real^{m\times p}$, and $\lambda\in\real_{++}$, we consider the constrained least squares problem:
$$
\mathop{\min}_{\bbeta\in\real^p} \normtwo{\bX\bbeta-\by}^2 + \lambda\normtwo{\bY\bbeta}^2.
$$
Show that the constrained least squares (CLS) problem has a unique solution if and only if $\nspace(\bX)\cap \nspace(\bY) = \{\bzero\}$.

\item \label{problem:wei_ls} \textbf{Weighted least squares (WLS).}
Going further from the assumptions in Theorem~\ref{theorem:ols}, we consider further that each data point $i\in\{1,2,\ldots, n\}$ (i.e., each row of $\bX$) has a weight $w_i$.
This means some  data points may carry greater significance than others and there are ways to produce approximate minimzers that reflect this.
Show that the value $\bbeta_{WLS} = (\bX^\top\bW^2\bX)^{-1}\bX^\top\bW^2\by$ serves as the \textit{weighted least squares (WLS)}  estimate of $\bbeta$, where $\bW=\diag(w_1, w_2, \ldots, w_n)\in\real^{n\times n}$. How is it related to the WLS we introduced in Section~\ref{section:generalizedLS}? \textit{Hint: find the normal equation for this problem.}

\item \label{prob:rls} \textbf{Restricted least squares (RLS).}
Going further from the assumptions in Theorem~\ref{theorem:ols}, we consider further the restriction $\bbeta=\bC\bgamma+\bc$, where $\bC\in\real^{p\times k}$ is a known matrix such that $\bX\bC$ has full rank, $\bc$ is a known vector, and $\bgamma$ is an unknown vector.
Show that the value $\bbeta_{RLS}=\bC(\bC^\top\bX^\top\bX\bC)^{-1}(\bC^\top\bX^\top)(\by-\bX\bc) +\bc$ serves as the \textit{restricted least squares (RLS)} estimate of $\bbeta$.

\item Find the restricted weighted least squares estimate.


\item \label{problem:pos_hessian} \textbf{Global minimum point.} Let function $f$ be a twice continuously differentiable function defined over $\real^p$. Suppose that the Hessian $\nabla^2f(\bbeta) \geq 0$ for any $\bbeta\in\real^p$. Then $\bbeta^\star$ is a global minimum point of $f$ if $\nabla f(\bbeta^\star)=\bzero$. \textit{Hint: use linear approximation theorem in Theorem~\ref{theorem:linear_approx}.}

\item \textbf{Two-sided matrix least squares.} Let $\bY$ be an $n\times k$ matrix and $\bZ$ be an $m\times p$ matrix. Find the $k\times m$ matrix $\bA$ such that $L(\bA)=\normf{\bX - \bY\bA\bZ}^2$ is minimized, where $\bX\in\real^{n\times p}$ is known. 
\begin{itemize}
\item Derive the derivative of $L$ with respect to $\bA$ and the optimality conditions. 
\item Show that one possible solution to the optimality conditions is $\bA=\bY^+\bX\bZ^+$, where $\bY^+$ and $\bZ^+$ are the pseudo-inverses of $\bY$ and $\bZ$, respectively.
\end{itemize}

\item \label{prob:statis_lev_semior} Let $\bQ\in\real^{n\times p}$ be any semi-orthogonal matrix whose columns span the column space of $\bX\in\real^{n\times p}$. Show that $h_{ii}$ of  $\bH = \bX(\bX^\top\bX)^{-1}\bX^\top$ can be obtained by $h_{ii} = \normtwo{\bq^{(i)}}^2$, where $\bq^{(i)}$ is the $i$-th row of $\bQ$. \textit{Hint: Use Theorem~\ref{theorem:orthogo_genspa}.}

\item Prove the Eckart-Young-Mirsky theorem w.r.t. Frobenius norm in Theorem~\ref{theorem:young-theorem_frob}. And show that this result also applies to the spectral norm (Definition~\ref{definition:spectral_norm}).

\item Prove the Hessian matrices \eqref{equation:als_hess_reg} in the ALS algorithm are positive definite after introducing regularizations.

\item Given a positive definite matrix $\bZ\in\real^{n\times n}$, show that  $\langle \bx, \by\rangle_{\bZ} = \bx^\top\bZ\by$ for all $\bx,\by\in\real^n$ is a valid norm that satisfies the three criteria in Definition~\ref{definition:inner_prod}. 

\end{problemset}

\newpage
\chapter{Numerical Methods for Least Squares Approximations}\label{chapter:ls_approx_num}
\begingroup
\hypersetup{
linkcolor=structurecolor,
linktoc=page,  
}
\minitoc \newpage
\endgroup

\section{General Ideas and Indirect Methods}\label{section:gradient-descent-all} 
\lettrine{\color{caligraphcolor}W}
When considering how long it takes to solve least squares (LS) problems, one can choose between two main types of methods: direct methods and indirect (or iterative) methods. Below is a brief overview of \textit{direct methods} for solving LS problems:
\begin{itemize}
\item \textit{Cholesky decomposition.} If the design matrix $\bX\in\real^{n\times p}$ has full column rank and is well-conditioned, then one can use the Cholesky decomposition to compute an upper triangular matrix $\bR$ such that $\bX^\top\bX = \bR^\top \bR$.
Once this decomposition is obtained, the normal equation $\bR^\top\bR \bbeta = \bX^\top \by$ can be solved efficiently.
\item \textit{QR decomposition.} 
Slightly slower but more numerically stable than Cholesky---especially when $\bX$ is rank-deficient or ill-conditioned---this method involves computing the QR decomposition $\bX = \bQ\bR$, where $\bQ$ is orthogonal and $\bR$ is upper triangular. The LS solution is then obtained by solving: $\bR \bbeta = \bQ^\top \by$.
\item \textit{SVD.} 
More computationally expensive but even more stable---particularly for very ill-conditioned matrices---the SVD computes:
$\bX = \bU\bSigma \bV^\top$, where this is the reduced SVD. 
The LS solution is then given by:
$
\widehat{\bbeta} = \bV \bSigma^{-1} \bU^\top \by.
$
For further details, see Section~\ref{section:ls-via-svd}.
\end{itemize}

The computational complexity of all these direct methods is $\mathcalO(np^2)$. That is, although the numerical stability and constant factors differ among the methods, all three classes of algorithms asymptotically require time proportional to $np^2$. In most cases, using QR decomposition offers a good balance between speed and stability. 

Another broad class of algorithms for solving LS and other problems are \textit{iterative methods}, among which gradient descent is the simplest example.

In this chapter, we will introduce approaches based on indirect (iterative) methods, as well as those using Cholesky and QR decompositions, including their computational aspects. The solution via SVD was already discussed in Section~\ref{section:ls-via-svd}. Computing the SVD requires more background knowledge and will not be covered here. For more information, refer to, for example, \citet{golub2013matrix}, \citet{lu2021numerical}, and \citet{bjorck2024numerical}.

\subsection*{Indirect Methods and Gradient Descent}
The general ideas or called indirect methods for solving least squares problems are those using descent methods to obtain the solution.
The \textit{gradient descent} (GD) method is a specific type of descent method used to find the (local or global) minimum of a differentiable function, whether convex or non-convex. This function is commonly referred to as the \textit{cost function} (also known as the \textit{loss function} or \textit{objective function}).
It stands out as one of the most popular algorithms to perform optimization and by far the
most common way to optimize machine learning, deep learning, and various optimization problems. 
This is especially true for optimizing neural networks and transformer networks \citep{lecun2015deep, goodfellow2016deep, vaswani2017attention}. 
In the context of machine learning, the cost function measures the difference between a model’s predicted output and the actual output. Neural networks, transformer networks, and machine learning models in general seek to find a set of parameters $\btheta\in \real^n$ (also known as weights, coefficients) that optimize an objective function $f(\btheta)$. This is expressed as the unconstrained optimization problem (P1):
$$
\textbf{(P1)}:\qquad \text{Find}\quad \btheta^* = \mathop{\argmin}_{\btheta} f(\btheta).
$$
Denoting $t=1,2,\ldots$ as the iteration number, iterative methods generate a sequence of vectors:
\begin{equation}
\btheta^{(1)}, \btheta^{(2)}, \ldots, \btheta^{(T)} \in \dom(f)
\end{equation}
\footnote{Some texts denote the starting point as $\btheta^{(0)}$, but in this book, we use $\btheta^{(1)}$.}
such that as $T\rightarrow \infty$, the sequence converges to the optimal solution $\btheta^*$, and the objective function value $f(\btheta^{(T)})$ approaches the optimal minimum $f(\btheta^*)$, under certain mild conditions.
At each iteration $t$, an \textit{update step (or a descent step)} $\bh^\toptzero$ is applied to update the parameters. Denoting the parameters at the $t$-th iteration as $\btheta^\toptzero$,  the update rule  is given by:
\begin{equation}\label{equation:sg_update_rule}
(\textbf{GD update}):\qquad 
\btheta^\toptone \leftarrow \btheta^\toptzero + \bh^\toptzero.
\end{equation}

\index{Learning rate}
\index{Strict SGD}
\index{Mini-batch SGD}
\index{Local minima}
\paragrapharrow{Gradient descent.}
The most basic form of gradient descent is the \textit{vanilla update}, where the parameters move in the opposite direction of the gradient. This follows the \textit{steepest descent direction} since gradients are orthogonal to level curves (also known as level surfaces, see Lemma~\ref{lemm:direction-gradients}):
\begin{equation}\label{equation:gd-equaa-gene}
\bh^\toptzero = -\eta_t \bg^\toptzero\triangleq -\eta_t \nabla f(\btheta^\toptzero),
\end{equation}
where the positive value $\eta_t$ denotes the \textit{learning rate  (or stepsize, step length, step size)} that depends on specific problems.
The term $\bg^\toptzero\triangleq\nabla f(\btheta^\toptzero) \in \real^n$ represents the gradient of the parameters.
The {learning rate} $\eta_t$ controls how large of a step to take in the direction of negative gradient so that we can reach a (local) minimum.
The method that follows the negative gradient direction (i.e., $\bd^\toptzero \triangleq -\nabla f(\btheta^\toptzero)$ in Algorithm~\ref{alg:struc_gd_gen}) is called the \textit{steepest descent method (or gradient method)}. 
The choice of descent direction is ``the best" (locally; see \eqref{equation:steep_des}) and we could combine it with an exact line search to determine the learning rate \citep{lu2025practical}. A method like this converges, but the final convergence is linear and often very slow. 

Examples in \citet{madsen2010and, boyd2004convex} show how the gradient descent method with exact line search and finite computer precision can fail to find the minimizer of a second degree polynomial. However, for many problems, it performs well in the early stages of the iterative process.
Considerations like this has lead to the so-called \textit{hybrid methods}, which---as the name suggests---are based on two different methods. One which is good in the initial stage, like the \textit{gradient method}, and another method which is good in the final stage, like \textit{Newton's method}. A key challenge with hybrid methods is designing an effective mechanism to switch between the two approaches at the appropriate time.

In \eqref{equation:sg_update_rule}, $\bh^\toptzero$ is referred to as a \textit{descent step}. While a direction $\bd^\toptzero$ satisfying the condition $\innerproduct{\bg^\toptzero,\bd^\toptzero}<0$ is called a \textit{descent direction}. In most cases, the relationship between the descent step and descent direction follows a scale by the learning rate:
\begin{equation}\label{equation:desce_step_direc}
\begin{aligned}
&\textbf{(Descent direction)}: \qquad&&\bd^\toptzero = -\bg^\toptzero;\\
&\textbf{(Descent step)}:\qquad &&\bh^\toptzero=\eta_t\bd^\toptzero.
\end{aligned}
\end{equation}
In many cases, when the learning rate is equal to 1, the above two terms are used \textbf{interchangeably},  then the descent direction and the descent step are the same; for example, in Newton's method.

\index{Newton's method}

\paragrapharrow{Gradient descent by calculus}
An intuitive analogy to understand gradient descent is to imagine the path of a river starting from a mountain peak and flowing downhill to reach the lowest point at its base.
Similarly, the goal of gradient descent is to find the lowest point in the landscape defined by the objective function $f(\btheta)$, where $\btheta$ represents  a $n$-dimensional input variable. Our task is to use an algorithm that guides us to a (local) minimum of $f(\btheta)$.
To better understand this process, consider moving a ball a small distance $h_1$ along the $\theta_1$ axis, a small amount $h_2$ along the $\theta_2$ axis, and so on up to $h_n$ along the $\theta_n$ axis. 
Calculus informs us of the variation in the objective function  $f(\btheta)$ as follows:
$$
\Delta f(\btheta) \approx \frac{\partial f}{\partial \theta_1}h_1 + \frac{\partial f}{\partial \theta_2}h_2 + \ldots + \frac{\partial f}{\partial \theta_n}h_n.
$$
Our challenge is to choose $h_1, h_2, \ldots, h_n$ such that they cause $\Delta f(\btheta)$ to be negative, thereby decreasing the objective function towards minimization.
Let $\bh=[h_1,h_2, \ldots , h_n]^\top$ denote  the vector of  changes in $\btheta$, and let $\nabla f(\btheta)=[\frac{\partial f}{\partial \theta_1},\frac{\partial f}{\partial \theta_2}, \ldots , \frac{\partial f}{\partial \theta_n}]^\top$ denotes the gradient vector of $f(\btheta)$ \footnote{Note the difference between $\Delta f(\btheta)$ and $\nabla f(\btheta)$.}. Then it follows that
$$
\Delta f(\btheta) \approx \frac{\partial f}{\partial \theta_1}h_1 + \frac{\partial f}{\partial \theta_2}h_2 +\ldots  + \frac{\partial f}{\partial \theta_n}h_n = \innerproduct{\nabla f(\btheta), \bh}.
$$ 
In the context of descending the function, our aim is to ensure that $\Delta f(\btheta)$ is negative. 
This ensures that moving from $\btheta^\toptzero$ to $\btheta^\toptone = \btheta^\toptzero+\bh^\toptzero$  (from $t$-th iteration to $(t+1)$-th iteration) results in a reduction of the loss function $f(\btheta^\toptone) = f(\btheta^\toptzero) + \Delta f(\btheta^\toptzero)$, given that $\Delta f(\btheta^\toptzero) \leq 0$.
It can be demonstrated that if the update step is defined as $\bh^\toptzero=-\eta_t \nabla f(\btheta^\toptzero)$, where $\eta_t$ is the learning rate, the following relationship holds:
$$
\Delta  f(\btheta^\toptzero) \approx -\eta_t \nabla f(\btheta^\toptzero)^\top\nabla f(\btheta^\toptzero) = -\eta_t\normtwobig{\nabla f(\btheta^\toptzero)}^2 \leq 0. 
$$
Specifically,  $\Delta  f(\btheta^\toptzero) < \bzero$ unless we are already at the optimal point with zero gradients.
This analysis validates the approach of gradient descent.

\index{Convex functions}
\paragrapharrow{Gradient descent for convex functions.}
We further explore the application of gradient descent in (unconstrained) convex problems.
If the objective function $f(\btheta)$ is (continuously differentiable) convex, then the relationship $\innerproduct{\nabla f(\btheta^\toptzero), (\btheta^\toptone-\btheta^\toptzero)}\geq 0$ implies $f(\btheta^\toptone) \geq f(\btheta^\toptzero)$. This can be derived from the gradient inequality of a continuously differentiable convex function, i.e., $f(\btheta^\toptone)- f(\btheta^\toptzero)\geq \innerproduct{\nabla f(\btheta^\toptzero), (\btheta^\toptone-\btheta^\toptzero)}$; see Theorem~\ref{theorem:conv_gradient_ineq}. 

In this sense, to ensure a reduction in the objective function from the point $\btheta^\toptzero$ to $\btheta^\toptone$, it is imperative to ensure  $\innerproduct{\nabla f(\btheta^\toptzero), (\btheta^\toptone-\btheta^\toptzero)}\leq 0$. 
In the context of  gradient descent, the choice of $\eta_t\bd^\toptzero = \btheta^\toptone-\btheta^\toptzero$ aligns with the negative gradient $-\nabla f(\btheta^\toptzero)$. However, there are many other descent methods, such as \textit{(non-Euclidean) greedy descent}, \textit{normalized steepest descent}, \textit{Newton step}, and so on. The core principle of these methods is to ensure that $\innerproduct{\nabla f(\btheta^\toptzero), (\btheta^\toptone-\btheta^\toptzero)}= \innerproduct{\nabla f(\btheta^\toptzero), \eta_t \bd^\toptzero} \leq 0$, provided the objective function is convex.

\paragrapharrow{Gradient descent with momentum.}
\textit{Gradient descent with momentum} is an improvement over basic gradient descent, frequently used in machine learning and deep learning to minimize the loss function and update model parameters. While standard gradient descent updates parameters solely based on the current gradient, momentum-based gradient descent introduces a \textit{momentum} term to accelerate convergence and smooth the optimization path.

In this approach, the momentum term enables the algorithm to build velocity in directions with a steady but small gradient, helping it overcome local minima and saddle points. By incorporating a fraction of the previous update into the current one, this technique mimics inertia, allowing the algorithm to continue moving in the same direction despite minor fluctuations in the gradient. Consequently, this method not only speeds up convergence but also reduces oscillations, particularly in regions where the surface curvature varies significantly across different dimensions.
At each iteration $t$, the process involves two key steps:
\begin{subequations}\label{equation:gd_with_momentum}
\begin{align}
\textbf{(Velocity update)}:\qquad \bd^\toptzero &\leftarrow \rho \bd^\toptminus - \eta_t \nabla f(\btheta^\toptzero);\\
\textbf{(Parameter update)}:\qquad \btheta^\toptone &\leftarrow \btheta^\toptzero + \bd^\toptzero.
\end{align}
\end{subequations}
By incorporating past gradients into the update rule, gradient descent with momentum enables more efficient traversal across the error surface, particularly in complex landscapes, leading to faster convergence and improved performance.
In summary, the gradient descent with momentum approach is advantageous for the following reasons \citep{lu2025practical}:
\begin{itemize}
\item At saddle points, the gradient of the cost function becomes nearly zero or entirely negligible. This results in minimal or no updates to the weights, causing the  learning process to stagnate and effectively halt.
\item The trajectory taken by the gradient descent method tends to be quite erratic, even when employing mini-batch processing. This jittery path can impede efficient convergence towards the minimum.
\end{itemize}

\paragrapharrow{Steepest descent.}
The linear approximation theorem (Theorem~\ref{theorem:linear_approx}) states that 
\begin{equation}\label{equation:gd_gree_tay11}
f(\btheta^\toptzero + \eta \bd) = f(\btheta^\toptzero)+ \eta \bd^\top \nabla  f(\btheta^\toptzero )+ \mathcalO(\normtwo{\eta \bd}^2).
\end{equation}
From \eqref{equation:gd_gree_tay11} and by the definition of directional derivative, we observe that when taking  a step $\eta \bd$ with a positive stepsize $\eta$,  the relative reduction in function value satisfies
$$
\lim_{\eta \rightarrow 0} \frac{f(\btheta^\toptzero) - f(\btheta^\toptzero + \eta \bd)}{\eta \normtwo{\bd}} = -\frac{1}{\normtwo{\bd}} \bd^{\top} \nabla f(\btheta^\toptzero) = \normtwobig{\nabla f(\btheta^\toptzero)} \cos (\phi),
$$
where $\phi$ is the angle between the vectors $\bd$ and $-\nabla f(\btheta^\toptzero)$. This equation indicates that we get the greatest gain rate if $\phi = 0$, meaning the optimal descent direction is the steepest descent direction $\bd_{\text{sd}}^\toptzero$, given by
\begin{equation}\label{equation:steep_des}
\bd_{\text{sd}}^\toptzero = -\nabla f(\btheta^\toptzero).
\end{equation}
That is, the steepest descent method coincides with the gradient descent method.

\paragrapharrow{Stochastic gradient descent.} 
In many cases, the function $f(\btheta)$ is defined over a datasets $\mathcalD=\{\bs_1, \bs_2, \ldots, \bs_D\}$ such that $f(\btheta)$ and its gradient $\nabla f(\btheta)$ can be expressed as 
\begin{equation}
f(\btheta)\triangleq f(\mathcalD; \btheta) = \frac{1}{D} \sum_{d=1}^{D} f(\bs_d; \btheta)
\qquad \text{and}\qquad 
\nabla f(\btheta) \triangleq \frac{1}{D}  \sum_{d=1}^{D}\nabla f(\bs_d; \btheta),
\end{equation} 
respectively.
While if we follow the negative gradient of a single sample or a batch of samples iteratively, the local estimate of the direction can be obtained and is known as the \textit{stochastic gradient descent} (SGD) \citep{robbins1951stochastic}.
The SGD method can be categorized  into two types:
\begin{itemize}
\item \textbf{The strict SGD:} Computes the gradient using only one randomly selected data point per iteration: $\nabla f(\btheta^\toptzero) \approx \nabla f(\bs_d; \btheta^\toptzero)$.
\item \textbf{The mini-batch SGD:} A compromise between full gradient descent and strict SGD, where a small subset (mini-batch) of the dataset is used to compute an estimate of the gradient: $\nabla f(\btheta^\toptzero) \approx \frac{1}{\abs{\sS}}  \sum_{d\in\sS}\nabla f(\bs_d; \btheta^\toptzero)$.
\end{itemize}
The SGD method is particular useful when the number of \textit{training entries} (i.e., the data used for updating/training the model, while the data used for final evaluation is called the \textit{test entries or test data}) are substantial, as computing the full gradient can be computationally expensive or even resulting in that the gradients from different input samples may cancel out and the final update is small.
However, since the gradient is estimated using only a subset of the data, the updates can be noisy.
In the SGD framework, the objective function is stochastic, composed of a sum of subfunctions evaluated at different subsamples of the data. However, a drawback of the vanilla update (both GD and SGD) lies in its susceptibility to getting trapped in local minima \citep{rutishauser1959theory}.

\paragrapharrow{Choice of stepsize.}
For a small stepsize, gradient descent ensures a monotonic improvement at every iteration, guaranteeing convergence, albeit to a local minimum. However, the speed of the vanilla gradient descent method is generally slow, and it can exhibit a linear rate in case of poor curvature conditions. 
While choosing a stepsize larger than an optimal threshold may cause divergence in terms of the objective function.
Determining an optimal learning rate (whether global or per-dimension) becomes more of an art than science for many problems. 
Previous work has attempted to alleviate the need for manually selecting a global learning rate \citep{zeiler2012adadelta, lu2023adasmooth}, though such methods remain sensitive to other hyper-parameters. Alternatively,  \textit{(exact or inexact) line search strategies} can be applied to determine the stepsize more systematically \citep{lu2025practical}.

\paragrapharrow{Descending Property.}
Most (if not all) optimization methods incorporate mechanisms to enforce the descending property:
\begin{equation}\label{equation:des_prob1}
f(\btheta^\toptone) < f(\btheta^\toptzero). 
\end{equation}
This prevents convergence to a maximizer and also makes it less probable that we get convergence to a \textit{saddle point} (a vanishing gradient point that is neither a local minimum point nor a maximum point of the cost function). 
If the objective function has several minimizers, the final solution depends on the starting point $\btheta^{(1)}$. We do not know which of the minimizers that will be found; the specific minimizer found is not necessarily the one closest to $\btheta^{(1)}$.

As mentioned previously, in many cases the method produces vectors which converge towards the minimizer in two clearly different stages: the ``global stage" where $\btheta^{(1)}$ is far from the solution and we want the method to produce iterates which move steadily towards the optimizer $\btheta^*$, and the ``final stage" where $\btheta^\toptzero$ is close to $\btheta^*$ and seek faster convergence.

The global convergence properties of a method describe its behavior when initialization occurs at a point $\btheta^{(1)}$, which is not close to a (local) minimizer $\btheta^{*}$. 
Ideally, the iterates should move steadily toward a neighborhood of  $\btheta^{*}$. For instance, there are methods for which it is possible to prove that any accumulation point (i.e., limit of a subseries) of $\{\btheta^\toptzero\}_{t>0}$ is a stationary point, meaning the gradient vanishes:
$$
\nabla f(\btheta^\toptzero) \rightarrow \bzero \qquad \text{ for } \qquad t \rightarrow \infty. 
$$
While this does not eliminate the possibility of convergence to a saddle point or maximizer, the descending property \eqref{equation:des_prob1} typically prevents such cases in practice.
In this ``global phase", our primary concern is ensuring that losses do not increase (except for possibly the initial steps).
To analyze convergence in terms of iterates rather than function values, a natural potential function is 
$$
\text{$e_t\triangleq\normtwobig{\be^\toptzero},\quad$ where $\be^\toptzero\triangleq \btheta^\toptzero - \btheta^*$.}
$$
Let $\{\be^\toptzero\}_{t>0}$ denote the error sequence. The requirement for progress is:
$$
\normtwobig{\be^\toptone} < \normtwobig{\be^\toptzero}\qquad  \text{ for }\qquad t > t'.
$$

In the final stages of the iteration where the $\btheta^\toptzero$ are close to $\btheta^{*}$, we expect faster convergence. 
Local convergence analysis describes how quickly the iterates approach $\btheta^{*}$ to a desired accuracy. Some methods exhibit  linear convergence:
$$
\normtwobig{\be^\toptone}\leq c_{1}\normtwobig{\be^\toptzero}, \quad \text{with } 0 < c_{1} < 1 \text{ and } \btheta^\toptzero \text{ close to } \btheta^{*}.  
$$
However, higher-order convergence is preferable. For instance, quadratic convergence satisfies:
$$
\normtwobig{\be^\toptone}\leq c_{2}\normtwobig{\be^\toptzero}^{2}, \quad \text{with } c_{2} > 0 \text{ and } \btheta^\toptzero \text{ close to } \btheta^{*}. 
$$
Few practical methods achieve quadratic convergence, but superlinear convergence  is a common goal:
$$
{\normtwobig{\be^\toptone}}/{\normtwobig{\be^\toptzero}} \rightarrow 0 \quad\text{ for } t \rightarrow \infty.
$$
Superlinear convergence is faster than linear convergence, though typically not as rapid as quadratic convergence.

\paragrapharrow{Framework of a descent method.}

\begin{algorithm}[H] 
\caption{Structure of  Descent Methods}
\label{alg:struc_gd_gen}
\begin{algorithmic}[1] 
\Require A function $f(\btheta)$; 
\State {\bfseries Input:}  Initialize $\btheta^{(1)}$;
\For{$t=1,2,\ldots$}
\State Find a descent direction $\bd^\toptzero$ such that $\innerproduct{\bd^\toptzero, \bg^\toptzero}<0$;
\State Pick a stepsize $\eta_t$;
\State $\btheta^{(t+1)} \leftarrow \btheta^\toptzero + \eta_t \bd^\toptzero$;
\EndFor
\State (Output Option 1) Output  $\btheta_{\text{final}}\leftarrow \btheta^{(T)}$;
\State (Output Option 2) Output  $\btheta_{\text{avg}}\leftarrow \frac{1}{T}(\sum_{t=1}^{t}\btheta^\toptzero)$ or $\sum_{t=1}^{T} \frac{2t}{T(T+1)} \btheta^\toptzero$;
\State (Output Option 3) Output  $\btheta_{\text{best}}\leftarrow \argmin_{t\in\{1,2,\ldots,T\}} f(\btheta^\toptzero)$;
\end{algorithmic} 
\end{algorithm}

The methods presented in this book are descent methods, meaning they satisfy the descending condition \eqref{equation:des_prob1} at each iteration. Each iteration consists of:
\begin{itemize}
\item Finding a descent direction $\bd^\toptzero$ at the $t$-th iteration.
\item Determining a stepsize $\eta_t$ giving a good decrease in the function value.
\end{itemize}
This sequence of operations forms the foundation of descent algorithms, see Algorithm~\ref{alg:struc_gd_gen}.
The search direction $ \bd^\toptzero $ at each iteration must be a descent direction. 
This ensures that we can reduce $ f(\btheta) $ by choosing an appropriate walking distance, and thus we can satisfy the descending condition \eqref{equation:des_prob1}.

\begin{exercise}[GD for LS]
Derive the gradient descent methods for OLS (see \eqref{equation:ls_l2norm}), GLS (see \eqref{equation:gls_prob_loss}), and the augmented LS problems (see \eqref{equation:sys_aug_sys}).
\end{exercise}

\paragrapharrow{Stopping criteria.}
Ideally, a stopping criterion should indicate when the current error is sufficiently small:
$$
\textbf{(ST1)}:\qquad \normtwobig{\be^{\toptzero}} < \delta_{1}.
$$
Another ideal condition would be when the current function value is close enough to the minimum:
$$
\textbf{(ST2)}:\qquad f(\btheta^\toptzero ) - f(\btheta^{*}) < \delta_{2}.
$$
Both conditions reflect the convergence $ \btheta^\toptzero  \rightarrow \btheta^{*} $. 
However, they are impractical because $ \btheta^{*} $ and $ f(\btheta^{*}) $ are (in most cases) unknown~\footnote{In some cases, the value $f(\btheta^*)$ is known. For example, in the convex feasibility problem, we seek feasible points within a convex set, where $f(\btheta^*)$ is zero.}. 
Instead, we rely on approximations:
\begin{equation}\label{equation:des_stopcri1}
\textbf{(ST3)}:\qquad \normtwobig{\btheta^\toptone - \btheta^\toptzero} < \varepsilon_{1} \qquad \text{or} \qquad f(\btheta^\toptzero ) - f(\btheta^\toptone) < \varepsilon_{2}.
\end{equation}
We must emphasize that even if \eqref{equation:des_stopcri1} is fulfilled with small $ \varepsilon_{1} $ and $ \varepsilon_{2} $, it does not guarantee that  $ \normtwobig{\be^{\toptzero}} $ or $ f(\btheta^\toptzero ) - f(\btheta^{*}) $ are small.

Another form of convergence, mentioned earlier in this section, is $ \nabla f (\btheta^\toptzero ) \rightarrow \bzero $ for $ t \rightarrow \infty $. This leads to another commonly used stopping criterion:
\begin{equation}\label{equation:des_stopcri2}
\textbf{(ST4)}:\qquad \normtwobig{\nabla f(\btheta^\toptzero )} < \varepsilon_{3},
\end{equation}
which is included in many implementations of descent methods.

Another useful approach involves leveraging the property of converging function values. 
The quadratic approximation (Theorem~\ref{theorem:quad_app_theo}) of $ f $ at $ \btheta^{*} $ is
$$
f(\btheta^\toptzero ) \approx f(\btheta^{*}) + (\btheta^\toptzero  - \btheta^{*})^{\top} \nabla f(\btheta^{*}) + \frac{1}{2}(\btheta^\toptzero  - \btheta^{*})^{\top} \nabla^2 f(\btheta^{*})(\btheta^\toptzero - \btheta^{*}).
$$
Since $ \btheta^{*} $ is a local minimizer, we have $ \nabla f(\btheta^{*}) = \bzero $ and $ \bH^{*} \triangleq \nabla^2 f(\btheta^{*}) $ is positive semidefinite. This simplifies to:
$
f(\btheta^\toptzero ) - f(\btheta^{*}) \approx \frac{1}{2}(\btheta^\toptzero  - \btheta^{*})^{\top} \bH^{*}(\btheta^\toptzero  - \btheta^{*}).
$
Thus, another stopping criterion can be defined as:
$$
\textbf{(ST5)}:\qquad \frac{1}{2}(\btheta^\toptone - \btheta^\toptzero )^{\top} \bH^\toptzero(\btheta^\toptone - \btheta^\toptzero ) < \varepsilon_{4} \quad \text{with} \quad \btheta^\toptzero  \approx \btheta^{*}. 
$$
Here, $ \btheta^\toptzero  - \btheta^{*} $ is approximated by $ \btheta^\toptone - \btheta^\toptzero  $ and $ \bH^{*} $ is approximated by
$
\bH^\toptzero \triangleq \nabla^2 f(\btheta^\toptzero ).
$

In the following sections, we delve into a detailed exploration of the gradient descent method, examining its variations and adaptations from different perspectives. This comprehensive analysis aims to provide a deeper understanding of the algorithm, its formulations, challenges, and practical applications.

\index{Greedy search}
\index{Non-Euclidean gradient descent}
\index{Dual norm}
\subsection{Gradient Descent by Greedy Search and Variants}\label{section:als-gradie-descent-taylor}

We now consider  the \textit{greedy search} method such that $\btheta^\toptone    \leftarrow \mathop{\arg \min}_{\btheta^\toptzero} f(\btheta^\toptzero)$ under some mild conditions. 
The linear approximation theorem (Theorem~\ref{theorem:linear_approx}) shows that 
\begin{equation}\label{equation:gd_gree_tay}
f(\btheta^\toptzero + \eta \bd) = f(\btheta^\toptzero)+ \eta \bd^\top \nabla  f(\btheta^\toptzero )+ \mathcalO(\normtwo{\eta \bd}^2).
\end{equation}
For small values of  $\eta$, the term $\mathcalO(\normtwo{\eta \bd}^2)$ becomes negligible compared to the middle term. Therefore, we can approximate $f(\btheta^\toptzero  + \eta \bd)$ as
\begin{equation}\label{equation:gd_gree_approx}
f(\btheta^\toptzero + \eta \bd) \approx f(\btheta^\toptzero ) + \eta \bd^\top \nabla  f(\btheta^\toptzero ),
\end{equation}
when $\eta$ is sufficiently small. 
The second term on the right-hand side, $ \bd^\top \nabla  f(\btheta^\toptzero ) $, is the \textit{directional derivative} of $ f $ at $ \btheta^\toptzero $ in the direction $ \bd $. 
To reiterate, it indicates the approximate change in $ f $ for a small step $ \bd $. The step $ \bd $ is a descent direction if the directional derivative is negative.

To address how to choose $\bd$ to make the directional derivative as negative as possible, note that since the directional derivative $ \bd^\top \nabla  f(\btheta^\toptzero ) $ is linear in $ \bd $, it can be made arbitrarily negative by increasing $ \bd $ (provided $ \bd $ is a descent direction, i.e., $ \bd^\top \nabla  f(\btheta^\toptzero ) < 0 $). To make this question meaningful, we must limit the size of $ \bd $, or normalize by its length.

Let $ \norm{\cdot} $ be any norm on $ \real^n $. We define a \textit{normalized greedy descent direction} (with respect to the norm $ \norm{\cdot} $) as
\begin{equation}\label{equation:norma_greedes}
\bd_{\text{ngd}}^\toptzero \in \argmin_{\bd} \left\{ \bd^\top \nabla  f(\btheta^\toptzero ) \text{ s.t. } \norm{\bd} = 1 \right\}.
\end{equation}
(Note that there may be multiple minimizers.) A normalized greedy descent direction $ \bd_{\text{ngd}}^\toptzero $ is a step of unit norm that provides the largest decrease in the linear approximation of $ f $.
By the definition of the dual norm \eqref{equation:dual_norm_equa}, it follows that $\innerproduct{\bd_{\text{ngd}}^\toptzero, \nabla  f(\btheta^\toptzero )} =-\norm{\nabla  f(\btheta^\toptzero )}_{*}$, where $\norm{\cdot}_{*}$ denotes the dual norm.

Since the problem in \eqref{equation:norma_greedes} can  equivalently be stated using the constraint $\norm{\bd}\leq 1$, $\bd_{\text{ngd}}^\toptzero$ also lies within the set of primal counterparts of $\nabla  f(\btheta^\toptzero ) $, whose existence is shown in Definition~\ref{definition:set_primal}.

It is also convenient to consider an unnormalized greedy descent step $\bd_{\text{ugd}}^\toptzero$ by scaling the normalized greedy descent direction in a particular way:
\begin{equation}\label{equation:unnorma_greedes}
\bd_{\text{ugd}}^\toptzero \triangleq \norm{\nabla f(\btheta^\toptzero)}_{*} \bd_{\text{ngd}}^\toptzero.
\end{equation}
The reason for this particular unnormalization is that it aligns with the negative gradient (the steepest descent direction) when the underlying norm is the $\ell_2$ norm.
Note that for the greedy descent step, we have
$$
\nabla f(\btheta^\toptzero)^\top \bd_{\text{ugd}}^\toptzero = \norm{\nabla f(\btheta^\toptzero)}_{*} \nabla f(\btheta^\toptzero)^\top \bd_{\text{ngd}}^\toptzero = -\norm{\nabla f(\btheta^\toptzero)}_{*}^2.
$$
However, when the exact line search method is used to find the learning rate,  scale factors in the descent direction do not affect the outcome, so either the normalized or unnormalized direction can be used.

\paragrapharrow{Greedy search for $\ell_2$ norm.}
If we take the norm $\norm{\cdot}$ to be the $\ell_2$ norm, we find that the greedy descent direction in \eqref{equation:norma_greedes} is simply the negative gradient, i.e., 
\begin{equation}\label{equation:greedy_l2_ngdugd}
\bd_{\text{ngd}}^\toptzero = - \frac{\nabla f(\btheta^\toptzero )}{\normtwobig{\nabla f(\btheta^\toptzero )}}
\qquad \text{and}\qquad 
\bd_{\text{ugd}}^\toptzero = -\nabla f(\btheta^\toptzero ).
\end{equation}
The greedy descent method for the $\ell_2$ norm coincides with the gradient descent method (or the steepest descent method).
The above equality also shows that the unnormalized greedy search direction  corresponds to the negative gradient direction or the steepest descent direction.

\paragrapharrow{Greedy search for $\ell_1$ norm.}
As another example,  consider the greedy descent method for the $\ell_1$ norm. A normalized greedy descent direction can be characterized as
$$
\bd_{\text{ngd}}^\toptzero \in \argmin_{\bd} \left\{ \bd^\top \nabla  f(\btheta^\toptzero ) \text{ s.t. } \normone{\bd} \leq 1 \right\}.
$$
We use `$\in$' since the solution of the problem may not be unique.
Let $i$ be any index for which $\norminf{\nabla f(\btheta^\toptzero)} = \abs{(\nabla f(\btheta^\toptzero))_i}$. 
By the definition of the $\ell_\infty$ norm and the dual norm (Definition~\ref{definition:vec_l2_norm} and \eqref{equation:dual_norm_equa}), a normalized greedy descent direction $\bd_{\text{ngd}}^\toptzero$ for the $\ell_1$ norm is given by
$$
\bd_{\text{ngd}}^\toptzero = -\sign\left(\frac{\partial f}{\partial \theta_i}(\btheta^\toptzero)\right) \be_i,
$$
where $\be_i$ is the $i$-th  unit basis vector (see Example~\ref{example:set_primal_count}). An unnormalized greedy descent step is then
$$
\bd_{\text{ugd}}^\toptzero = \bd_{\text{ngd}}^\toptzero \norminf{\nabla f(\btheta^\toptzero)} = -\frac{\partial f}{\partial \theta_i} (\btheta^\toptzero) \be_i.
$$
which is a descent direction since $\innerproduct{\bd_{\text{ugd}}^\toptzero, \nabla f(\btheta^\toptzero)}<0$ (assuming $\nabla f(\btheta^\toptzero) \neq\bzero$).
Thus, the normalized greedy descent step in the $\ell_1$ norm can always be chosen to be a (positive or negative) standard basis vector, representing the coordinate axis direction along which the approximate decrease in $f$ is greatest.
Note that the index for which $\norminfbig{\nabla f(\btheta^\toptzero)} = \absbig{(\nabla f(\btheta^\toptzero))_i}$ may not be unique (see Example~\ref{example:set_primal_count}). In such cases, a convex combination of these descent directions can be used as the final descent direction.

The greedy descent algorithm in the $\ell_1$ norm has a  natural interpretation: At each iteration we select a component of $\nabla f(\btheta^\toptzero)$ with maximum absolute value (though the component may not be unique), and then decrease or increase the corresponding component of $\btheta^\toptzero$, according to the sign of $(\nabla f(\btheta^\toptzero))_i$. The algorithm is sometimes called a \textit{coordinate-descent algorithm} because only one component of the variable $\btheta$ is updated at each iteration, potentially simplifying or even trivializing the line search.

\paragrapharrow{Greedy search for $\bQ$-norm.}
We further consider the $\bQ$-norm: 
\begin{equation}\label{equation:q_norm}
\norm{\btheta}_{\bQ} = (\btheta^\top \bQ \btheta)^{1/2} = \normtwo{\bQ^{1/2} \btheta}
\end{equation}
for any $\btheta\in\real^n$,
where $\bQ$ is positive definite. The normalized greedy descent direction is given by
$$
\begin{aligned}
\bd_{\text{ngd}}^\toptzero 
&=\argmin_{\bd} \left\{ \bd^\top \nabla  f(\btheta^\toptzero ) \text{ s.t. } \norm{\bd}_{\bQ} \leq 1 \right\}
=-\normtwo{\bQ^{-1/2} \nabla f(\btheta^\toptzero)}^{-1/2}\bQ^{-1} \nabla f(\btheta^\toptzero). 
\end{aligned}
$$
This can be solved using the KKT conditions or the definition of the dual norm.
The dual norm is given by $\norm{\btheta}_{*} = \normtwo{\bQ^{-1/2} \btheta} =\norm{\btheta}_{\bQ^{-1}}$ for any $\btheta\in\real^n$, so the greedy descent step with respect to $\norm{\cdot}_{\bQ}$ is given by
\begin{equation}\label{equation:qnorm_ugd}
\bd_{\text{ugd}}^\toptzero = -\bQ^{-1} \nabla f(\btheta^\toptzero).
\end{equation}
This is a descent direction; see Problem~\ref{prob:qnorm_des}.

\paragrapharrow{Change of variables in $\bQ$-norm.}

An interesting alternative interpretation of the greedy descent direction $\bd_{\text{ugd}}^\toptzero $ is as the gradient search direction after applying a change of coordinates to the problem. Let $\widetildebx \triangleq \bQ^{1/2} \btheta$; thus, $\norm{\btheta}_{\bQ} = \normtwo{\widetildebx}$. Using this change of coordinates, we can solve the original problem of minimizing $f$ by solving the equivalent problem of minimizing the function $\widetildef : \real^n \rightarrow \real$, given by
$$
\widetildef(\widetildebx) \triangleq f(\bQ^{-1/2} \widetildebx) = f(\btheta).
$$
If we apply the gradient method to $\widetildef$, the search direction at a point $\widetildebx^\toptzero$ (which corresponds to the point $\btheta^\toptzero = \bQ^{-1/2} \widetildebx^\toptzero$ for the original problem) is
$$
\widetildebd^\toptzero = -\nabla \widetildef(\widetildebx^\toptzero) = -\bQ^{-1/2} \nabla f(\bQ^{-1/2} \widetildebx^\toptzero) = -\bQ^{-1/2} \nabla f(\btheta^\toptzero).
$$
Since $\widetildebx = \bQ^{1/2} \btheta$ by definition, the search direction in the original space is obtained by mapping $\widetildebd^\toptzero$ back using $\bQ^{-1/2}$: 
$$
\bd^\toptzero = \bQ^{-1/2} \widetildebd^\toptzero = -\bQ^{-1} \nabla f(\btheta^\toptzero)
$$
which corresponds to the unnormalized greedy search direction in \eqref{equation:qnorm_ugd}. In other words, the greedy descent method in the $\bQ$-norm $\norm{\cdot}_{\bQ}$ can be thought of as the gradient method applied to the problem after the change of variables $\widetildebx^\toptzero = \bQ^{1/2} \btheta^\toptzero$ for each iteration $t$.

\begin{exercise}[Greedy descent for LS]
Derive the greedy descent methods with $\bQ$-norm for OLS (see \eqref{equation:ls_l2norm}), GLS (see \eqref{equation:gls_prob_loss}), and the augmented LS problems (see \eqref{equation:sys_aug_sys}).
\end{exercise}

\subsection{Geometrical Interpretation of Gradient Descent} 
\begin{lemma}[Direction of gradients]\label{lemm:direction-gradients}
An important fact is that gradients are orthogonal to level curves (also known as level surfaces).
\end{lemma}
\begin{proof}[of Lemma~\ref{lemm:direction-gradients}: Informal]
To prove this, we need to show that the gradient is orthogonal to the tangent of the level curve. Let's start with the two-dimensional case. Suppose the level curve has the form $f(x,y)=c$. 
This equation implicitly defines a relationship  between $x$ and $y$, such that $y=y(x)$, where $y$ can be considered as a function of $x$. Therefore, the level curve can be expressed as:
$$
f(x, y(x)) = c.
$$
Applying the chain rule gives us:
$$
\frac{\partial f}{\partial x} \underbrace{\frac{dx}{dx}}_{=1} + \frac{\partial f}{\partial y} \frac{dy}{dx}=0.
$$
This implies that the gradient is perpendicular to the tangent vector:
$$
\left\langle \frac{\partial f}{\partial x}, \frac{\partial f}{\partial y}\right\rangle
\cdot 
\left\langle \frac{dx}{dx}, \frac{dy}{dx}\right\rangle=0.
$$
Now, let's generalize this to higher dimensions. Consider a level set defined by a vector $\btheta\in \real^n$: $f(\btheta) = f(\theta_1, \theta_2, \ldots, \theta_n)=c$. Each variable $\theta_i$ can be regarded as a function of a parameter $t$ along the level set $f(\btheta)=c$: $f(\theta_1(t), \theta_2(t), \ldots, \theta_n(t))=c$. Differentiating both sides with respect to $t$ using the chain rule yields:
$$
\frac{\partial f}{\partial \theta_1} \frac{d\theta_1}{dt} + \frac{\partial f}{\partial \theta_2} \frac{d\theta_2}{dt}
+\ldots + \frac{\partial f}{\partial \theta_n} \frac{d\theta_n}{dt}
=0.
$$
Therefore, the gradients is perpendicular to the tangent in the $n$-dimensional case:
$$
\left\langle \frac{\partial f}{\partial \theta_1}, \frac{\partial f}{\partial \theta_2}, \ldots, \frac{\partial f}{\partial \theta_n}\right\rangle
\cdot 
\left\langle \frac{d\theta_1}{dt}, \frac{d\theta_2}{dt}, \ldots, \frac{d\theta_n}{dt}\right\rangle=0.
$$
This completes the proof.
\end{proof}
The lemma above provides a profound geometric interpretation of gradient descent. In the process of minimizing a (convex) function $f(\btheta)$, gradient descent strategically moves in the direction opposite to the gradient, which reduces the loss. Figure~\ref{fig:alsgd-geometrical} illustrates a two-dimensional scenario where  $-\nabla f(\btheta)$ guides the decrease in loss for a (convex) function $f(\btheta)$. 

\begin{figure}[h]
\centering   
\vspace{-0.25cm}  
\subfigtopskip=2pt  
\subfigbottomskip=2pt  
\subfigcapskip=-5pt  
\subfigure[A two-dimensional convex function $f(\btheta)$.]{\label{fig:alsgd1}
\includegraphics[width=0.47\linewidth]{./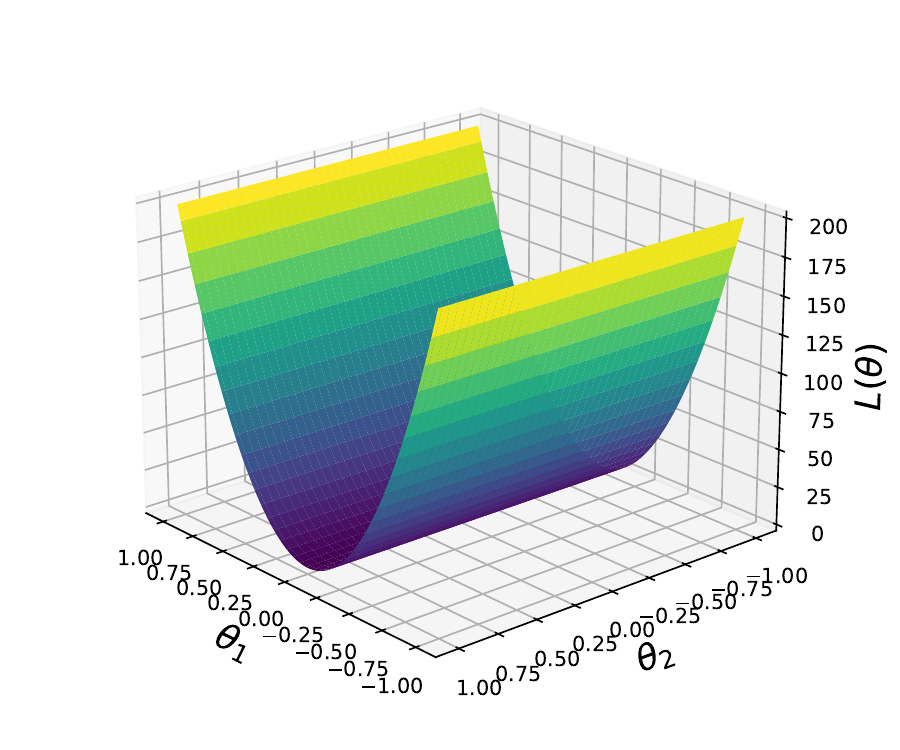}}
\subfigure[$f(\btheta)=c$ is a constant.]{\label{fig:alsgd2}
\includegraphics[width=0.44\linewidth]{./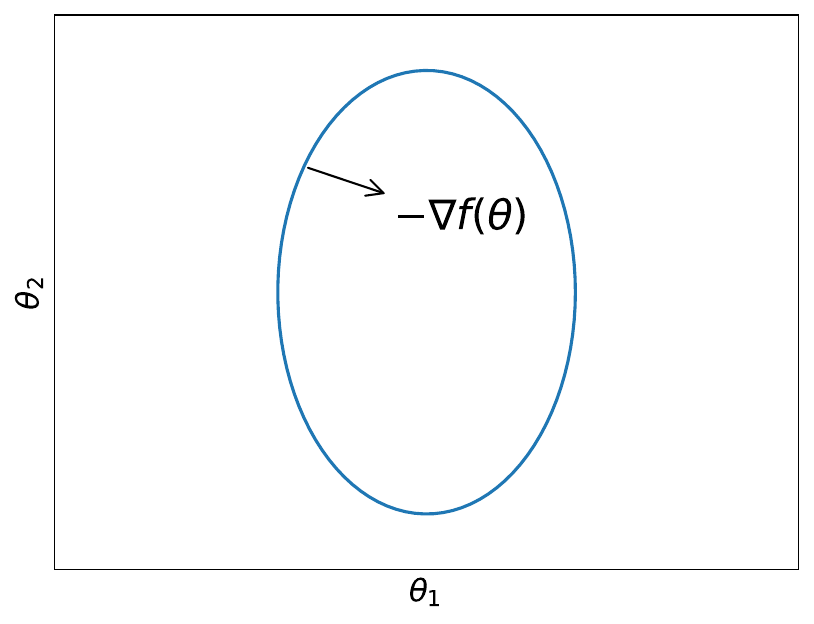}}
\caption{Figure~\ref{fig:alsgd1} shows a convex function surface plot and its contour plot (\textcolor{mylightbluetext}{blue}=low, \textcolor{mydarkyellow}{yellow}=high), where the upper graph represents  the surface plot, and the lower one is its projection (i.e., contour). Figure~\ref{fig:alsgd2}: $-\nabla f(\btheta)$ directs the reduction in loss for the convex function $f(\btheta)$.}
\label{fig:alsgd-geometrical}
\end{figure}

\index{Regularization}
\subsection{Geometrical Interpretation of Regularization}\label{section:geom_int_regu}
\begin{figure}[h]
\centering
\includegraphics[width=0.95\textwidth]{./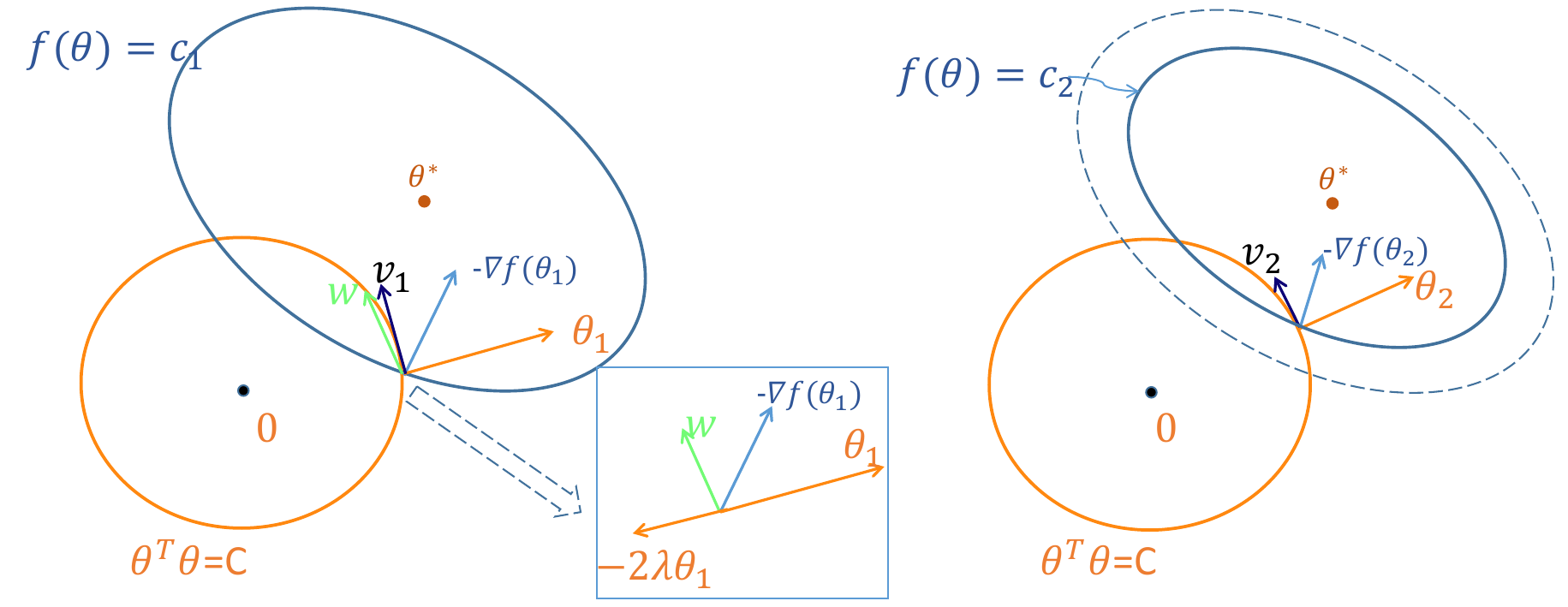}
\caption{Constrained gradient descent with $\btheta^\top\btheta\leq C$. The \textcolor{mydarkgreen}{green} vector $\bw$ is the projection of $\bv_1$ into $\btheta^\top\btheta\leq C$, where $\bv_1$ is the component of $-\nabla f(\btheta)$ perpendicular to $\btheta_1$. The right picture shows the next step after the update in the left picture. $\btheta^*$ denotes the optimal solution of \{$\min f(\btheta)$\}.}
\label{fig:alsgd3}
\end{figure}
\textit{Regularization} is a machine learning technique employed to prevent overfitting and improve model generalization; see Sections~\ref{section:totalles_otheriss} and  \ref{section:regularization-extention-general}. Overfitting occurs when a model is overly complex and fits the training data too closely, resulting in poor performance on  unseen data. 
To mitigate this issue, regularization introduces a constraint or a penalty term into the loss function used for model optimization, discouraging the development of overly complex models. 
This creates  a trade-off between having a simple, generalizable model and fitting the training data well. 
Common types of regularization include $\ell_1$ regularization, $\ell_2$ regularization (Tikhonov regularization), and elastic net regularization (a combination of $\ell_1$ and $\ell_2$ regularizations). 
Regularization finds extensive applications in machine learning algorithms such as linear regression, logistic regression, and neural networks.

Gradient descent also reveals the geometric significance of regularization. To avoid confusion, we denote the loss function without regularization by $f(\btheta)$ and the loss with the $\ell_2$ regularization by $F(\btheta) \triangleq f(\btheta)+\lambda \normtwo{\btheta}^2$, where $f(\btheta): \real^n \rightarrow \real$. 
When minimizing $f(\btheta)$, the descent method searches for a solution in $\real^n$. 
However, in machine learning, an exhaustive search across the entire space may lead to overfitting. A partial remedy involves searching within a subset of the vector space, such as searching in $\btheta^\top\btheta < C$ for some constant $C$. That is,
$$
\argmin_{\btheta} \, \big\{f(\btheta) \gap  \text{s.t.} \gap \btheta^\top\btheta\leq C\big\}.
$$
This constrained search helps prevent overfitting by introducing regularization through the addition of a penalty term in the optimization process.
In the previous discussion, a basic gradient descent approach proceeds in the direction of $-\nabla f(\btheta)$,  updating $\btheta$ by $\btheta^+\leftarrow \btheta-\eta \nabla f(\btheta)$ for a small stepsize $\eta$. 
When the level curve is $f(\btheta)=c_1$ and the descent approach is situated at $\btheta=\btheta_1$, where $\btheta_1$ is the intersection of $\btheta^\top\btheta=C$ and $f(\btheta)=c_1$, the descent direction $-\nabla f(\btheta_1)$ will be perpendicular to the level curve of $f(\btheta_1)=c_1$, as shown in the left picture of Figure~\ref{fig:alsgd3}. 
However, if we further restrict that the optimal value can only be in the subspace $\btheta^\top\btheta\leq C$, the trivial descent direction $-\nabla f(\btheta_1)$ will lead $\btheta_2=\btheta_1-\eta \nabla f(\btheta_1)$ outside of $\btheta^\top\btheta\leq C$. 

To address this,  the step $-\nabla f(\btheta_1)$ is decomposed  into 
$
-\nabla f(\btheta_1) = a\btheta_1 + \bv_1,
$
where $a\btheta_1$ is the component perpendicular to the curve of $\btheta^\top\btheta=C$, and $\bv_1$ is the component parallel to the curve of $\btheta^\top\btheta=C$. Keeping only the step $\bv_1$, then the update 
$$
\btheta_2 = \text{project}(\btheta_1+\eta \bv_1) = \text{project}\left(\btheta_1 + \eta
\underbrace{(-\nabla f(\btheta_1) -a\btheta_1)}_{\bv_1}\right)\footnote{where the project($\btheta$) operator
will project the vector $\btheta$ to the closest point inside $\btheta^\top\btheta\leq C$. Notice here the direct update $\btheta_2 = \btheta_1+\eta \bv_1$ can still make $\btheta_2$ outside the curve of $\btheta^\top\btheta\leq C$.}
$$ 
will lead to a smaller loss from $f(\btheta_1)$ to $f(\btheta_2)$ while still satisfying the constraint $\btheta^\top\btheta\leq C$. 
This technique is known as the \textit{projected gradient descent} \citep{beck2017first, lu2025practical}. It is not hard to see that the update $\btheta_2 = \text{project}(\btheta_1+\eta \bv_1)$ is equivalent to finding a vector $\bw$ (depicted by the \textcolor{mydarkgreen}{green} vector in the left panel of Figure~\ref{fig:alsgd3}) such that $\btheta_2=\btheta_1+\bw$ lies inside the curve of $\btheta^\top\btheta\leq C$. Mathematically, the vector $\bw$ can be obtained as $-\nabla f(\btheta_1) -2\lambda \btheta_1$ for some $\lambda$, as shown in the middle panel of Figure~\ref{fig:alsgd3}. This aligns with the negative gradient of $F(\btheta)=f(\btheta)+\lambda\normtwo{\btheta}^2$ such that 
$$
-\nabla F(\btheta_1) = -\nabla f(\btheta_1) - 2\lambda \btheta_1,
$$
and 
$$
\begin{aligned}
\bw &= -\nabla F(\btheta_1) 
\qquad \implies \qquad 
\btheta_2 = \btheta_1+ \bw =\btheta_1 -  \nabla F(\btheta_1).
\end{aligned}
$$
In practice, a small stepsize $\eta$ can be applied to prevent crossing  the curve boundary of $\btheta^\top\btheta\leq C$:
$$
\btheta_2  =\btheta_1 -  \eta\nabla F(\btheta_1).
$$

\index{Stochastic gradient descent}
\index{Gradient descent}
\index{Matrix inverse}
\index{LU decomposition}
\subsection{ALS via Gradient Descent}\label{section:als-gradie-descent}
In Algorithm~\ref{alg:als}, \ref{alg:als-regularizer}, and \ref{alg:als-regularizer-missing-entries}, we reduce the loss of the low-rank approximation or the alternating least squares (ALS) problem through the inversion of matrices (e.g., using LU decomposition \citep{lu2021numerical}). 
The reality, however, is frequently far from straightforward, particularly in the big data era of today. As data volumes explode, the size of the inversion matrix will grow at a pace proportional to the cube of the number of samples,  which poses a great challenge to the storage and computational resources.
On the other hand, this leads to the creation of an ongoing development of the gradient-based optimization technique.
The \textit{gradient descent (GD)} method and its variant, the \textit{stochastic gradient descent (SGD)} method, are among them the simplest, fastest, and most efficient methods. Convex loss function optimization problems are frequently solved using this type of approach. We now go into more details about its principle in the ALS context.

\subsection*{Gradient Descent}
In Equation~\eqref{equation:als-ori-all-wz}, we derive the column-by-column update rules for ALS directly from the full matrix approach outlined in Equation~\eqref{equation:als-regular-final-all} (with regularization taken into account). 
To understand the underlying concept, consider the loss function with regularization, as given by Equation~\eqref{equation:als-regularion-full-matrix}.
When minimizing the  loss in \eqref{equation:als-regularion-full-matrix} with respect to $\bz_p$, we can break down the loss as follows:
\begin{equation}\label{als:gradient-regularization-zn}
\footnotesize
\begin{aligned}
L(\bz_p)  &=\frac{1}{2}\normf{\bW\bZ-\bX}^2 +\frac{1}{2}\lambda_w \normf{\bW}^2 + \frac{1}{2}\lambda_z \normf{\bZ}^2
= \frac{1}{2}\normtwo{\bW\bz_p-\bx_p}^2 + \frac{1}{2}\lambda_z \normtwo{\bz_p}^2 + C_{z_p},
\end{aligned}
\end{equation}
where $C_{z_p}$ is a constant with respect to $\bz_p$, and $\bZ=[\bz_1, \bz_2, \ldots, \bz_P]$ and $\bX=[\bx_1,\bx_2, \ldots, \bx_P]$ represent the column partitions of $\bZ$ and $\bX$, respectively. 
The gradient and the root are given, respectively, by 
$$
\begin{aligned}
\nabla_{\bz_p} L(\bz_p) = \bW^\top\bW\bz_p - \bW^\top\bx_p + \lambda_z\bz_p
\quad\implies \quad
\bz_p = (\bW^\top\bW+ \lambda_z\bI)^{-1} \bW^\top \bx_p, \,\,  \forall\,p.
\end{aligned}
$$
This solution corresponds to the first update rule in the column-wise update in Equation~\eqref{equation:als-ori-all-wz}.
Similarly, when minimizing the loss with respect to $\bw_n$, we have:
\begin{equation}\label{als:gradient-regularization-wd}
\footnotesize
\begin{aligned}
L(\bw_n )  
&
=\frac{1}{2}\normf{\bZ^\top\bW-\bX^\top}^2 +\frac{1}{2}\lambda_w \normf{\bW^\top}^2 + \frac{1}{2}\lambda_z \normf{\bZ}^2
= \frac{1}{2}\normtwo{\bZ^\top\bw_n-\bb_p}^2 + \frac{1}{2}\lambda_w \normtwo{\bw_n}^2 + C_{w_n},
\end{aligned}
\end{equation}
where $C_{w_n}$ is a constant with respect to $\bw_n$, and $\bW^\top=[\bw_1, \bw_2, \ldots, \bw_N]$ and $\bX^\top=[\bb_1,\bb_2, \ldots,$ $\bb_N]$ represent the column partitions of $\bW^\top$ and $\bX^\top$, respectively. 
Analogously, taking the gradient with respect to $\bw_n$, it follows that
$$
\begin{aligned}
\nabla_{\bw_n} L(\bw_n) = \bZ\bZ^\top\bw_n - \bZ\bb_p + \lambda_w\bw_n
\quad\implies\quad
\bw_n = (\bZ\bZ^\top+\lambda_w\bI)^{-1}\bZ\bb_n, \,\, \forall \, n.
\end{aligned}
$$
This solution corresponds to the second update rule in the column-wise update in Equation~\eqref{equation:als-ori-all-wz}:

Now suppose we express the iteration number ($t=1,2,\ldots$)  as the superscript, and we want to find the updates $\{\bz^{(t+1)}_p, \bw^{(t+1)}_n\}$  in the $(t+1)$-th iteration  base on $\{\bZ^{(t)}, \bW^{(t)}\}$  in the $t$-th iteration:
$$
\left.
\begin{aligned}
\bz^{(t+1)}_p    &\leftarrow \mathop{\arg \min}_{\bz_p^{(t)}} L(\bz_p^{(t)})
\qquad\text{and}\qquad
\bw_n^{(t+1)}    \leftarrow \mathop{\arg\min}_{\bw_n^{(t)}} L(\bw_n^{(t)}).
\end{aligned}
\right.
$$
For simplicity, we will only derive for $\bz^{(t+1)}_p    \leftarrow \mathop{\arg \min}_{\bz_p^{(t)}} L(\bz_p^{(t)})$, and the derivation for the update on $\bw_n^{(t+1)}$ will follow a similar process.
With this insight, the gradient descent update for the ALS algorithms is provided in Algorithm~\ref{alg:als-regularizer-missing-stochas-gradient}.

It's noteworthy that the ALS without GD (Algorithm~\ref{alg:als-regularizer})  lacks explicit parameters like step size. 
This characteristic can be both advantageous and disadvantageous. 
On one hand, it absolves the  user from the time-consuming task of fine-tuning parameters, making the method more accessible and less demanding. 
On the other hand, this absence of adjustable parameters also restricts the user's control to directly influence the progression of the algorithm, leaving the convergence of ALS entirely contingent upon the inherent structure of the optimization problem at hand.

In practical applications, it is customary to alternate between the pure ALS iterations outlined in Algorithm~\ref{alg:als-regularizer} and the modified, gradient-descent variants  mentioned in this section. These descent adaptations offer the user a degree of control through a tunable step length parameter, allowing for a more customized approach to the optimization process.

\begin{algorithm}[h] 
\caption{Alternating Least Squares with Full Entries and Gradient Descent}
\label{alg:als-regularizer-missing-stochas-gradient}
\begin{algorithmic}[1] 
\Require Matrix $\bX\in \real^{N\times P}$;
\State Initialize $\bW\in \real^{N\times K}$, $\bZ\in \real^{K\times P}$ \textcolor{mylightbluetext}{randomly without condition on the rank and the relationship between $N, P, K$}; 
\State Choose a stoping criterion on the approximation error $\delta$;
\State Choose regularization parameters $\lambda_w, \lambda_z$, and step sizes $\eta_w, \eta_z$;
\State Choose the maximum number of iterations $C$;
\State $iter=0$; \Comment{Count for the number of iterations}
\While{$\normf{\bX- (\bW\bZ)}^2>\delta $ and $iter<C$}
\State $iter=iter+1$; 
\For{$p=1,2,\ldots, P$}
\State $\bz^{(t+1)}_p \leftarrow\bz^{(t)}_p - \eta_z {\nabla L(\bz^{(t)}_p )}$; 
\Comment{$p$-th column of $\bZ$}
\EndFor

\For{$n=1,2,\ldots, N$}
\State $\bw^{(t+1)}_n  \leftarrow \bw^{(t)}_n - \eta_w {\nabla L(\bw^{(t)}_n )}$; 
\Comment{$n$-th column of $\bW^\top$}
\EndFor
\EndWhile
\State Output $\bW^\top=[\bw_1, \bw_2, \ldots, \bw_N],\bZ=[\bz_1, \bz_2, \ldots, \bz_P]$;
\end{algorithmic} 
\end{algorithm}

\index{Stochastic gradient descent}
\index{Stochastic coordinate descent}
\subsection*{Stochastic Gradient Descent}
The gradient descent method is a valuable optimization algorithm; however, it exhibits certain limitations in practical applications. 
To comprehend the issues associated with the gradient descent method, we consider the mean squared error (MSE) derived from  \eqref{equation:als-per-example-loss}:
\begin{equation}\label{equation:als-per-example-loss_mse}
\frac{1}{NP}\mathop{\min}_{\bW,\bZ}  \sum_{p=1}^P \sum_{n=1}^{N} \left(x_{np} - \bw_n^\top\bz_p\right)^2.
\end{equation}
The MSE requires calculating the residual $e_{np} \triangleq (x_{np} - \bw_n^\top\bz_p)^2$ for each observed entry $x_{np}$, representing the squared difference between predicted and actual values.  The total sum of residual squares is denoted by $e = \sum_{n,p=1}^{NP}e_{np}$.
In cases with a substantial number of training entries (i.e., large $NP$), the entire computation process becomes notably slow. 
Additionally, the gradients from different input samples may cancel out, resulting in small changes in the final update.
As mentioned previously, researchers have enhanced the gradient descent method with the \textit{stochastic gradient descent (SGD)} method  to address these challenges. 
In the SGD algorithm, instead of calculating the full gradient of the objective function with respect to the parameters  across all samples in the data set, which can be computationally expensive, the algorithm takes a more efficient approach. It randomly chooses one sample and calculates the gradient of the objective function with respect to the parameters using only  this single sample. 
This gradient estimate is then used to update the parameters in the direction that minimizes the objective function. 
By using a single sample at each iteration, the SGD algorithm provides a fast and often sufficient approximation of the full gradient, making it particularly useful for large data sets.

In particular, we consider again the per-example loss:
$$
L(\bW,\bZ)= \frac{1}{2} \sum_{p=1}^P \sum_{n=1}^{N} \left(x_{np} - \bw_n^\top\bz_p\right)^2 +\frac{1}{2} \lambda_w\sum_{n=1}^{N}\normtwo{\bw_n}^2 +\frac{1}{2}\lambda_z\sum_{p=1}^{P}\normtwo{\bz_p}^2.
$$
As we iteratively reduce the  loss term $l(\bw_n, \bz_p)=\frac{1}{2}\left(x_{np} - \bw_n^\top\bz_p\right)^2+\frac{1}{2} \lambda_w\normtwo{\bw_n}^2 +\frac{1}{2}\lambda_z\normtwo{\bz_p}^2$ for all $n\in \{1,2,\ldots, N\}, p\in\{1,2,\ldots,P\}$ (referred to as the per-example loss term), the overall loss $L(\bW,\bZ)$ decreases accordingly.
This process is also known as  \textit{stochastic coordinate descent}. The gradients  with respect to $\bw_n$ and $\bz_p$, and their roots are given, respectively, by 
$$
\left\{
\begin{aligned}
\nabla_{\bz_p} l(\bz_p) &= \bw_n\bw_n^\top \bz_p +\lambda_z\bz_p  -x_{np} \bw_n 
&\implies&\,\, \bz_p= x_{np}(\bw_n\bw_n^\top+\lambda_z\bI)^{-1}\bw_n;\\
\nabla_{\bw_n} l(\bw_n) &= \bz_p\bz_p^\top\bw_n +\lambda_w\bw_n - x_{np}\bz_p &\implies&\,\, \bw_n= x_{np}(\bz_p\bz_p^\top+\lambda_w\bI)^{-1}\bz_p.
\end{aligned}
\right.
$$
Alternatively, the update can be performed using gradient descent. Since we update based on the per-example loss, this approach is also known as the \textit{stochastic gradient descent (SGD)}:
$$
\left.
\begin{aligned}
\bz_p&\leftarrow \bz_p - \eta_z {\nabla_{\bz_p} l(\bz_p)} 
\qquad \text{and}\qquad 
\bw_n\leftarrow \bw_n - \eta_w {\nabla_{\bw_n} l(\bw_n)} 
\end{aligned}
\right.
$$
The stochastic gradient descent update for ALS is formulated in Algorithm~\ref{alg:als-regularizer-missing-stochas-gradient-realstoch}. 
It is possible that the gradient descent or stochastic gradient descent algorithm may fail to converge. In such cases, it is appropriate to re-run the algorithm using a smaller learning rate.
And in practice, the values of $n$ and $p$ in the algorithm can be randomly generated, which is why the method is termed ``stochastic." ~\footnote{When we iteratively choose the values of $n$ and $p$ from $\{1,2,\ldots, N\}$ and $\{1,2,\ldots, P\}$ in a deterministic cyclic order, respectively, the stochastic method can be referred to as  ``\textit{incremental gradient descent}."}

\begin{algorithm}[h] 
\caption{Alternating Least Squares with Full Entries and SGD}
\label{alg:als-regularizer-missing-stochas-gradient-realstoch}
\begin{algorithmic}[1] 
\Require  Matrix $\bX\in \real^{N\times P}$;
\State Initialize $\bW\in \real^{N\times K}$, $\bZ\in \real^{K\times P}$ \textcolor{mylightbluetext}{randomly without condition on the rank and the relationship between $N, P, K$}; 
\State Choose a stoping criterion on the approximation error $\delta$;
\State Choose regularization parameters $\lambda_w, \lambda_z$, and step size $\eta_w, \eta_z$;
\State Choose the maximum number of iterations $C$;
\State $iter=0$; \Comment{Count for the number of iterations}
\While{$\normf{  \bX- (\bW\bZ)}^2>\delta $ and $iter<C$}
\State $iter=iter+1$; 
\For{$p=1,2,\ldots, P$}
\For{$n=1,2,\ldots, N$} \Comment{in practice, $n,p$ can be randomly produced}
\State $\bz_p\leftarrow \bz_p - \eta_z {\nabla l(\bz_p)}$; 
\Comment{$p$-th column of $\bZ$}
\State $\bw_n\leftarrow \bw_n - \eta_w {\nabla l(\bw_n)}$; 
\Comment{$n$-th column of $\bW^\top$}
\EndFor
\EndFor

\EndWhile
\State Output $\bW^\top=[\bw_1, \bw_2, \ldots, \bw_M],\bZ=[\bz_1, \bz_2, \ldots, \bz_P]$;
\end{algorithmic} 
\end{algorithm}

\begin{exercise}
Following the missing entry update in ALS (Section~\ref{section:alt-columb-by-column}), derive a mini-batch SGD version for the ALS problems. Note that the mini-batch SGD is a balance between the GD (where the computation can be extensive) and the strict SGD (where the interaction between different entries can be mitigated, which is important in the Netflix context) algorithms.
\end{exercise}


\index{Cholesky decomposition}
\index{Positive definiteness}
\section{LS via Cholesky Decomposition}\label{section:ls_cholesky}

As mentioned in Chapter~\ref{chapter:ls_approx}, the LS problem can be solved using the Cholesky decomposition. We begin by providing ways to compute the Cholesky decomposition of a positive definite matrix.

\paragrapharrow{Computing  Cholesky decomposition recursively.}
To compute the Cholesky decomposition, we start by writing out the equality $\bA=\bR^\top\bR$, where $\bR$ is upper triangular:
$$
\footnotesize
\setlength{\arraycolsep}{2pt}
\begin{aligned}
\bA=
\begin{bmatrix}
a_{11} & \bA_{1,2:n} \\
\bA_{2:n,1} & \bA_{2:n,2:n}
\end{bmatrix}
&=
\begin{bmatrix}
r_{11} & 0 \\
\bR_{1,2:n}^\top & \bR_{2:n,2:n}^\top
\end{bmatrix}
\begin{bmatrix}
r_{11} & \bR_{1,2:n} \\
0 & \bR_{2:n,2:n}
\end{bmatrix}
=
\begin{bmatrix}
r_{11}^2 & r_{11}\bR_{1,2:n} \\
r_{11}\bR_{1,2:n}^\top & \bR_{1,2:n}^\top\bR_{1,2:n} + \bR_{2:n,2:n}^\top\bR_{2:n,2:n}
\end{bmatrix}.
\end{aligned}
$$
Since the diagonals of $\bR$ are positive when $\bA$ is positive definite (Theorem~\ref{theorem:cholesky-factor-exist}), 
this allows us to determine the first row of $\bR$ by 
$$
r_{11} = \sqrt{a_{11}}
\qquad \text{and}\qquad 
\bR_{1,2:n} = \frac{1}{r_{11}}\bA_{1,2:n}.
$$
Let $\bA_2\triangleq \bR_{2:n,2:n}^\top\bR_{2:n,2:n}$. The equality $\bA_{2:n,2:n} = \bR_{1,2:n}^\top\bR_{1,2:n} + \bR_{2:n,2:n}^\top\bR_{2:n,2:n}$ and the symmetry of $\bA$ indicate
$$
\begin{aligned}
\bA_2=\bR_{2:n,2:n}^\top\bR_{2:n,2:n} &= \bA_{2:n,2:n} - \bR_{1,2:n}^\top\bR_{1,2:n} 
= \bA_{2:n,2:n} - \frac{1}{a_{11}} \bA_{2:n,1}\bA_{1,2:n},
\end{aligned}
$$
where $\bA_2$ is known as the \textit{Schur complement} of $a_{11}$ in $\bA$ and has a  size of $(n-1)\times (n-1)$. To obtain $\bR_{2:n,2:n}$, we must compute the Cholesky decomposition of the matrix $\bA_2$ of shape $(n-1)\times (n-1)$. 
This implies a recursive algorithm to computing the Cholesky decomposition of a PD matrix $\bA$, and the procedure is outlined in Algorithm~\ref{alg:compute-choklesky}.

\begin{algorithm}[H] 
\caption{Cholesky Decomposition via Recursive Algorithm} 
\label{alg:compute-choklesky} 
\begin{algorithmic}[1] 
\Require 
Positive definite matrix $\bA$ with size $n\times n$; 
\State Calculate first row of $\bR$ by $r_{11} \leftarrow \sqrt{a_{11}}, \bR_{1,2:n} \leftarrow \frac{1}{r_{11}}\bA_{1,2:n}$; 
\State Compute the Cholesky decomposition of the $(n-1)\times (n-1)$ matrix
$$
\bA_2\leftarrow\bR_{2:n,2:n}^\top\bR_{2:n,2:n}=\bA_{2:n,2:n} - \frac{1}{a_{11}} \bA_{2:n,1}\bA_{1,2:n};
$$

\end{algorithmic} 
\end{algorithm}


\begin{theoremHigh}[Algorithm complexity: Cholesky recursively \citep{lu2021numerical}]\label{theorem:cholesky-complexity}
Algorithm~\ref{alg:compute-choklesky} requires $\sim(1/3)n^3$ flops to compute the Cholesky decomposition of an $n\times n$ positive definite matrix.
\end{theoremHigh}

%
%

The Cholesky decomposition computation mentioned above has an important application in testing the positive definiteness of a symmetric matrix. To perform the test, one can apply the algorithm mentioned above and declare the matrix as positive definite if the algorithm completes without encountering any negative or zero pivots (as described in step 1 above). Otherwise, if the algorithm encounters such pivots, the matrix is deemed not positive definite.

To end up this section, we provide the full pseudo code for Algorithm~\ref{alg:compute-choklesky} as shown in Algorithm~\ref{alg:compute-choklesky11} (compare the two algorithms).  
\begin{algorithm}[H] 
\caption{Cholesky Decomposition via Recursive Algorithm: Full Pseudo Code} 
\label{alg:compute-choklesky11} 
\begin{algorithmic}[1] 
\Require 
Positive definite matrix $\bA$ with size $n\times n$; 
\For{$k=1$ to $n$} \Comment{compute the $k$-th row of $\bR$}
\State $r_{kk} \leftarrow \sqrt{a_{kk}}$; \Comment{first element of $k$-th row}
\State $\bR_{k,k+1:n} \leftarrow \frac{1}{r_{kk}} \bA_{k,k+1:n}$; \Comment{the rest elements of $k$-th row}
\State $\bA_{k+1:n,k+1:n} \leftarrow \bA_{k+1:n,k+1:n} - \bR_{k,k+1:n}^\top\bR_{k,k+1:n}$; 
\EndFor
\State Output $\bA=\bR^\top\bR$.
\end{algorithmic} 
\end{algorithm}

\paragrapharrow{An alternative perspective of the recursive algorithm.}
Since $\bL\triangleq \bR^\top$ is lower triangular. The lower triangular factor $\bL$ can be computed as a product of a series of lower triangular matrices. 
To see this, we have
$$
\bA=
\begin{aligned}
\begin{bmatrix}
a_{11} & \bA_{1,2:n} \\
\bA_{2:n,1} & \bA_{2:n,2:n}
\end{bmatrix} 
=
\begin{bmatrix}l_{11} & \bzero \\ 
\bL_{21} & \bL_{22} 
\end{bmatrix}
\begin{bmatrix}
l_{11} & \bL_{21}^\top \\ 
\bzero & \bL_{22}^\top 
\end{bmatrix} 
\end{aligned} 
\triangleq \bL\bL^\top.
$$
Then we still have 
$$
\begin{bmatrix}
a_{11} & \bA_{1,2:n} \\
\bA_{2:n,1} & \bA_{2:n,2:n}
\end{bmatrix}  
= 
\begin{bmatrix}
l_{11}^2 & l_{11}\bL_{21}^\top \\ 
l_{11}\bL_{21} & \bL_{21}\bL_{21}^\top+\bL_{22}\bL_{22}^\top 
\end{bmatrix} 
\implies
\begin{cases}
l_{11} &= \sqrt{a_{11}}; \\ 
\bL_{21} &= \frac{1}{l_{11}}\bA_{2:n,1}; \\ 
\bL_{22}\bL_{22}^\top &= \bA_{2:n,2:n} - \bL_{21}\bL_{21}^\top .
\end{cases}
$$
The second perspective involves constructing $n+1$ set of  $n \times n$ matrices: $\bA^{(1)},\bA^{(2)}, \ldots, \bA^{(n+1)}$, where $\bA^{(1)}\triangleq\bA$, and we want to obtain $\bA^{(n+1)}=\bI$ via the relation:
\begin{equation}\label{equation:choes_recur_secon}
\bA^{(i)} = \bL^{(i)}\bA^{(i+1)}\bL^{(i)^\top}, \ \forall\, i \in\{1,2,\ldots,n\}.
\end{equation}
If these $\bL^{(i)}, \ \forall\, i$ are lower triangular, then we obtain the Cholesky decomposition by 
$$
\bA= 
(\bL^{(1)} \bL^{(2)}\ldots \bL^{(n)}) (\bL^{(1)} \bL^{(2)}\ldots \bL^{(n)})^\top
\triangleq \bL\bL^\top.
$$
This is indeed the case. To see this, we can construct 
$$
\bA^{(i)} \triangleq
\begin{bmatrix} 
\bI_{i-1} & 0 & \bzero \\ 
0 & a_{ii} & \bb_i^\top \\ 
\bzero & \bb_i & \bB^{(i)} 
\end{bmatrix} 
\qquad\text{and}\qquad
\bL^{(i)} = 
\begin{bmatrix} 
\bI_{i-1} & 0 & \bzero \\ 
0 & \sqrt{a_{ii}} & \bzero \\ 
\bzero & \frac{1}{\sqrt{a_{ii}}}\bb_i & \bI_{n-i} 
\end{bmatrix},
$$
satisfying  $\bA^{(i)} = \bL^{(i)}\bA^{(i+1)}(\bL^{(i)})^\top$:
$$
\begin{aligned} \bA^{(i+1)} 
&= \begin{bmatrix} 
\bI_{i-1} & 0 & \bzero \\ 
0 & 1 & \bzero \\ 
\bzero & \bzero & \bB^{(i)}-\frac{1}{a_{ii}}\bb_i\bb_i^\top 
\end{bmatrix} 
\triangleq 
\begin{bmatrix} 
\bI_i & 0 & \bzero \\ 
0 & a_{i+1, i+1} & \bb_{i+1}^\top \\ 
\bzero & \bb_{i+1} & \bB^{(i+1)} 
\end{bmatrix}.
\end{aligned}
$$
Therefore, $\bA$ can be decomposed as a set of lower triangular matrices in \eqref{equation:choes_recur_secon}.
Using the result in Exercise~\ref{exercise:choe_recur_sec} can show that the algorithm for this perspective is equivalent to Algorithm~\ref{alg:compute-choklesky11}.

\begin{exercise}\label{exercise:choe_recur_sec}
Verify that $\bL^{(i)}_{i:,i} = \bL_{i:,i}, i = 1,2,\ldots, n$.
\end{exercise}

\paragrapharrow{Computing Cholesky decomposition element-wise.}

It is also common to compute the Cholesky decomposition using element-wise equations derived directly from solving the matrix equation $\bA=\bR^\top\bR$. Observing that the $(i,j)$-th entry of $\bA$ is given by $a_{ij} = \bR_{:,i}^\top \bR_{:,j} = \sum_{k=1}^{i} r_{ki}r_{kj}$ if $i<j$. This further implies the  following recurrence relation: if $i<j$, we have
$$
\begin{aligned}
a_{ij} &= \bR_{:,i}^\top \bR_{:,j} = \sum_{k=1}^{i} r_{ki}r_{kj} 
= \sum_{k=1}^{i-1} r_{ki}r_{kj} + r_{ii}r_{ij}
\implies
r_{ij} = (a_{ij} - \sum_{k=1}^{i-1} r_{ki}r_{kj})/r_{ii},
\gap 
\text{if }i<j.
\end{aligned}
$$
For the diagonal entries ($i=j$), we have:
$$
\begin{aligned}
a_{jj} &= \sum_{k=1}^{j} r_{kj}^2=\sum_{k=1}^{j-1} r_{kj}^2 + r_{jj}^2
&\implies
r_{jj} = \sqrt{a_{jj} - \sum_{k=1}^{j-1} r_{kj}^2}.
\end{aligned}
$$
If we equate the elements of $\bR$ by taking a column at a time and start with $r_{11} = \sqrt{a_{11}}$, the element-level algorithm is formulated in Algorithm~\ref{alg:compute-choklesky-element-level}.

\begin{algorithm}[H] 
\caption{Cholesky Decomposition Element-Wise: $\bA=\bR^\top\bR$} 
\label{alg:compute-choklesky-element-level} 
\begin{algorithmic}[1] 
\Require 
Positive definite matrix $\bA$ with size $n\times n$; 
\State Calculate first element of $\bR$ by $r_{11} \leftarrow \sqrt{a_{11}}$; 
\For{$j=1$ to $n$} \Comment{Compute the $j$-th column of $\bR$}
\For{$i=1$ to $j-1$} 
\State $r_{ij} \leftarrow (a_{ij} - \sum_{k=1}^{i-1} r_{ki}r_{kj})/r_{ii}$, since $i<j$;
\EndFor
\State $r_{jj} \leftarrow \sqrt{a_{jj}- \sum_{k=1}^{j-1}r_{kj}^2}$;
\EndFor
\State Output $\bA=\bR^\top\bR$.
\end{algorithmic} 
\end{algorithm}
On the other hand, Algorithm~\ref{alg:compute-choklesky-element-level}  can be modified to compute the Cholesky decomposition in the form $\bA=\bL\bD\bL^\top$, where $\bL$ is unit lower triangular and $\bD$ is diagonal, as outlined in Algorithm~\ref{alg:compute-choklesky-_ldl}, whose Step 3 and Step 5 are derived from (since $l_{ii}=1, \forall\, i\in\{1,2,\ldots,n\}$):
$$
\begin{aligned}
a_{jj}&=\sum_{k=1}^{j-1}d_{kk} l_{jk}^2 + d_{jj};\\
a_{ij}&= d_{jj} l_{ij}+ \sum_{k=1}^{j-1} d_{kk} l_{ik}l_{jk}, \gap \text{if }i>j.
\end{aligned}
$$
\begin{exercise}
Derive the complexity of Algorithm~\ref{alg:compute-choklesky-_ldl}.
\end{exercise}
This form of Cholesky decomposition is useful for determining the condition number of a PD matrix \citep{lu2021numerical}. 
In essence, the condition number of a function measures the sensitivity of the output value to small changes in the input; a smaller condition number indicates better numerical stability. 
For positive definite linear systems, the condition number is defined as the ratio of the largest eigenvalue to the smallest eigenvalue.
The condition number of a positive definite matrix is lower bounded by the diagonal matrix in the Cholesky decomposition (see Problem~\ref{problem:cond_pd}):
\begin{equation}\label{equation:cond_pd_ineq}
\cond(\bA) \geq \cond(\bD).
\end{equation}
This can be proven by showing that $\lambda_{\max}\geq d_{\max}$ and $\lambda_{\min}\leq d_{\min}$, where $\lambda_{\max}$ and $\lambda_{\min}$ are the largest and smallest eigenvalue of $\bA$, and $d_{\max}$ and $d_{\min}$ are the largest and smallest diagonals of $\bD$.
Therefore, this form of the Cholesky decomposition can be utilized  to modify Newton's method; see \citep{lu2025practical}.

\begin{algorithm}[h] 
\caption{Cholesky Decomposition Element-Wise: $\bA=\bL\bD\bL^\top$}  
\label{alg:compute-choklesky-_ldl} 
\begin{algorithmic}[1] 
\Require 
Positive definite matrix $\bA$ with size $n\times n$; 
\For{$j=1$ to $n$} \Comment{Compute the $j$-th column of $\bL$}
\State $l_{jj}\leftarrow1$;
\State $c_{jj}\leftarrow a_{jj}-\sum_{k=1}^{j-1}d_{kk} l_{jk}^2$;
\State $d_{jj}\leftarrow c_{jj}$
\For{$i=j+1$ to $n$} 
\State $c_{ij}\leftarrow a_{ij}-\sum_{k=1}^{j-1}d_{kk} l_{ik}l_{jk}$, since $i>j$;
\State $l_{ij}\leftarrow \frac{c_{ij}}{d_{jj}}$;
\EndFor
\EndFor
\State Output $\bA=\bL\bD\bL^\top$, where $\bD=\diag(d_{11}, d_{22},\ldots,d_{nn})$.
\end{algorithmic} 
\end{algorithm}

\subsection{Full Rank Case}
The classical method for solving a linear least squares problem $\min_{\bbeta} \normtwo{\bX\bbeta - \by}$, $\bX \in \real^{n \times p}$, is to form and solve the symmetric normal equation $\bX^\top \bX\bbeta = \bX^\top \by$. If $\rank(\bX) = p$, then $\bbeta \neq \bzero$ implies that $\bX\bbeta \neq \bzero$. Hence
\begin{equation}
\bbeta^\top \bX^\top \bX\bbeta > 0, \quad \forall\, \bbeta \in \real^p, \quad \bbeta \neq \bzero,
\end{equation}
and $\bX^\top \bX$ is positive definite. Conversely, a symmetric positive definite matrix is nonsingular. If it were singular, there would be a vector $\bbeta$ such that $\bX\bbeta = \bzero$. But then $\bbeta^\top \bX\bbeta = 0$, which is a contradiction.

\index{Normal equation}
\paragrapharrow{Normal equation of the first kind.}

Substituting the Cholesky factorization $ \bX^\top \bX = \bR^\top \bR $ into the normal equation gives $ \bR^\top \bR \bbeta = \balpha $, where $ \balpha \triangleq \bX^\top \by $. Hence, the solution is obtained by solving two triangular systems:
\begin{equation}
\bR^\top \bu = \balpha, \qquad \bR \bbeta = \bu.
\end{equation}
This method is easy to implement and often faster than other direct solution methods. It works well unless $ \bX $ is ill-conditioned.

It is often preferable to work with the Cholesky factorization of the cross-product of the augmented matrix $[\bX ,\by ]$:
\begin{equation}\label{equation:corss_prod_eq}
\begin{bmatrix} 
\bX^\top \\ 
\by^\top 
\end{bmatrix} 
\begin{bmatrix} 
\bX & \by 
\end{bmatrix} 
= 
\begin{bmatrix} 
\bX^\top \bX & \bX^\top \by \\ 
\by^\top \bX & \by^\top \by
\end{bmatrix},
\end{equation}
when solving a least squares problem. If $\rank(\bX) = p$, then the Cholesky factor of the cross-product \eqref{equation:corss_prod_eq},
\begin{equation}
\bS = \begin{bmatrix} 
\bR & \bv \\ 
\bzero & \rho 
\end{bmatrix},
\end{equation}
exists, where we may have $\rho = 0$. Forming $\bS^\top \bS$ shows that
$$
\bX^\top \bX = \bR^\top \bR, \qquad \bR^\top \bv = \bX^\top \by, \qquad \by^\top \by = \bv^\top \bv + \rho^2.
$$
Hence, $\bR$ is the Cholesky factor of $\bX^\top \bX$, and the least squares solution is obtained from $\bR\bbeta = \bv$. Since $\be = \by - \bX\bbeta$ is orthogonal to $\bX\bbeta$, we have
$$
\normtwo{\bX\bbeta}^2 = (\be + \bX\bbeta)^\top \bX\bbeta = \by^\top \bX \bbeta = \by^\top \bX \bR^{-1} \bR^{-\top} \bX^\top \by = \bv^\top \bv,
$$
and hence $\normtwo{\be}^2 = \rho^2 = \by^\top \by - \bv^\top \bv$ and $\normtwo{\by - \bX\bbeta} = \rho$.

\paragrapharrow{Cholesky QR factorization.}
On the other hand, let $\bX \in \real^{n\times p}$ have full column rank, and let $\bX^\top \bX = \bR^\top \bR$ be its Cholesky factorization. 
Define $\bQ_1 \triangleq \bX \bR^{-1} \in\real^{n\times p}$. Then, 
\begin{equation}\label{equation:qr_cho_dec}
	\bX = \bQ_1 \bR 
	\qquad\text{and}\qquad 
	\bQ_1^\top \bQ_1 = \bI_p
\end{equation}
is the \textit{Cholesky QR factorization} of $\bX$. The semi-orthogonal factor $\bQ_1$ can be obtained as the unique solution of the lower triangular matrix equation $\bR^\top \bQ_1^\top = \bX^\top$ by forward substitution. In this setting, the normal equation simplifies to $\bR^\top \bQ_1^\top \bQ_1 \bR\bbeta = \bR^\top \bR\bbeta = \bR^\top \bQ_1^\top \by$ or
$ \bR\bbeta = \bQ_1^\top \by. $

In the real case, the computational cost of this Cholesky QR algorithm is $\sim 2np^2 + p^3/3$ flops. 
More accurate methods for computing the QR factorization \eqref{equation:qr_cho_dec} directly from $\bX$ are described in Section~\ref{section:ls_qr_gen}.

\paragrapharrow{Normal equation of the second kind.}
For a consistent underdetermined linear system $ \bX\bbeta = \by $, the solution to the least-norm problem $\min \normtwo{\bbeta}$ subject to $ \bX\bbeta = \by $ satisfies the normal equation of the second kind in \eqref{equation:consis_minimunorm}:
$$ 
\bbeta = \bX^\top \bgamma
\qquad\text{and}\qquad 
\bX \bX^\top \bgamma = \by. 
$$
If $ \bX $ has full row rank, then $ \bX \bX^\top $ is symmetric positive definite, and the Cholesky factorization $ \bX \bX^\top = \bR^\top \bR $ exists. Then $ \bgamma $ is obtained by solving
\begin{equation}
\bR^\top \bw = \by, \qquad \bR \bgamma = \bw.
\end{equation}

\subsection{Modifying LS:  Adding or Deleting a Data/Row}\label{section:cholesky-rank-one-update}

Updating linear systems after low-rank modifications of the system matrix is a  common practice in machine learning, statistics, and many other fields. 
However, it is widely recognized that such updates can introduce significant instabilities due to round-off errors \citep{seeger2004low}. 
When the system matrix is positive definite, employing a representation based on Cholesky decomposition is usually preferable as it provides improved numerical stability \citep{gill1974methods, bojanczyk1987note,  chang1997pertubation, davis1999modifying, seeger2004low, chen2008algorithm, davis2008user, higham2009cholesky}. 

On the other hand, many applications require the solution of a least squares problem after the data have been modified by adding (updating) or deleting (downdating) observations. Examples arise in regression problems, optimization, signal processing, and prediction in control
theory \citep{bjorck2024numerical}.
In this section, we will present a proof for the rank-one update/downdate using Cholesky decomposition.
\subsection*{Rank-One Update}\index{Rank-one update}

Note that we follow the dimension notation of the underlying matrix for the Cholesky decomposition such that $\bA\in\real^{n\times n}$. However, the LS notation uses $\bA=\bX^\top\bX \in\real^{p\times p}$. 

A rank-one update $\bA^\prime$ of a matrix $\bA$ by a new data vector $\bv$ is of the form:
\begin{equation*}
\begin{aligned}
\bA^\prime &= \bA + \bv \bv^\top,   \quad \text{with} \quad \bA\triangleq \bX^\top\bX \ \text{ and }\ \bA^\prime\triangleq \begin{bmatrixfoot}
\bX^\top &
\bv^\top 
\end{bmatrixfoot}^\top \begin{bmatrixfoot}
\bX \\
\bv^\top
\end{bmatrixfoot},\\
\downarrow &\gap  \downarrow\\
\bR^{\prime\top}\bR^\prime &= \bR^\top\bR + \bv \bv^\top.
\end{aligned}
\end{equation*}
If the Cholesky factor $\bR$ of $\bA \in \real^{n\times n}$ has already been computed, we can efficiently obtain the Cholesky factor $\bR^\prime$ of $\bA^\prime$.
Note that $\bA^\prime$ differs from $\bA$ only by the symmetric rank-one matrix. 
Therefore, we can compute $\bR^\prime$ from $\bR$ using the rank-one Cholesky update, which takes $\mathcalO(n^2)$ operations, each saving from $\mathcalO(n^3)$ complexity if we were to recompute the Cholesky decomposition of $\bA^\prime$ from scratch,
given that we  know $\bR$, the Cholesky decomposition of $\bA$ up front. 
That is, we want to compute the Cholesky decomposition of $\bA^\prime$ via that of $\bA$. 
To see this,
suppose there exists a set of orthogonal matrices $\bQ_n \bQ_{n-1}\ldots \bQ_1$ such that  
$$
\bQ_n \bQ_{n-1}\ldots \bQ_1 
\begin{bmatrix}
\bv^\top \\
\bR
\end{bmatrix}
=
\begin{bmatrix}
\bzero \\
\bR^\prime
\end{bmatrix}.
$$
Then we find out the expression for the Cholesky factor of $\bA^\prime$ by $\bR^\prime$. 
Specifically, multiplying the left-hand side of the above equation by its transpose yields
$$
\begin{bmatrix}
\bv & \bR^\top
\end{bmatrix}
\bQ_1^\top \ldots \bQ_{n-1}^\top\bQ_n^\top
\bQ_n \bQ_{n-1}\ldots \bQ_1 
\begin{bmatrix}
\bv^\top \\
\bR
\end{bmatrix}
= \bR^\top\bR + \bv \bv^\top.
$$ 
Similarly, multiplying the right-hand side by its transpose gives
$$
\begin{bmatrix}
\bzero & \bR^{\prime\top}
\end{bmatrix}
\begin{bmatrix}
\bzero \\
\bR^\prime
\end{bmatrix}=\bR^{\prime\top}\bR^\prime,
$$
which matches the left-hand side equation. \textit{Givens rotations} are examples of such orthogonal matrices that can transfer $\bR$ and $\bv$ into $\bR^\prime$. 

\begin{definition}[$n$-th Order Givens Rotation]\label{definition:givens-rotation-in-qr}
A \textit{Givens rotation} is represented by a matrix of the following form
$$
\bG_{kl}=
\begin{bmatrix}
1 &          &   &  &   &   & &  & &\\
& \ddots  &  &  &  & && & &\\
&      & 1 &  & & &  && &\\
&      &  & c &  &  &  & s & &\\
&& &   & 1 & & && &\\
&& &   &   &\ddots &  && &\\
&& &  &   &  & 1&& &\\
&& & -s &  &  & &c& &\\
&& & &  &  & & &1 & \\
&& & &  &  & & & &\ddots
\end{bmatrix}_{n\times n},
$$
where the $(k,k), (k,l), (l,k), (l,l)$ entries are $c, s, -s, c$, respectively, and $s = \sin \theta$ and $c=\cos \theta$ for some $\theta$.

Let $\bdelta_k \in \real^n$ be the zero vector except that the entry $k$ is 1 (the $k$-th unit basis vector). Then, mathematically, the Givens rotation defined above can be denoted by 
$$
\bG_{kl}\triangleq\bG_{kl}(\theta)= \bI + (c-1)(\bdelta_k\bdelta_k^\top + \bdelta_l\bdelta_l^\top) + s(\bdelta_k\bdelta_l^\top -\bdelta_l\bdelta_k^\top ),
$$
where the subscripts $k$ and $l$ indicate that the rotation occurs \textbf{in plane $k$ and $l$}.

Specifically, one can also define the $n$-th order Givens rotation, where $(k,k),$ $(k,l),$ $(l,k),$ $(l,l)$ entries are $c, \textcolor{mylightbluetext}{-s, s}, c$ respectively (note the difference in the sign of $s$). The underlying principles remain the same.
\end{definition}

\begin{exercise}
Show that $\bG_{kl}(-\theta)^{-1}=\bG_{kl}(\theta)$. \textit{Hint: Use the orthogonality of $\bG_{kl}(\theta)$.}
\end{exercise}

It can be easily verified that an $n$-th order Givens rotation is an orthogonal matrix, and its determinant is 1. For any vector $\bx =[x_1, x_2, \ldots, x_n]^\top \in \real^n$, the result of applying a Givens rotation $\bG_{kl}$ to $\bx$ is $\by = \bG_{kl}\bx$, where
$$ 
\left\{
\begin{aligned}
&y_k = c \cdot x_k + s\cdot x_l,   \\
&y_l = -s\cdot x_k +c\cdot x_l,  \\
&y_j = x_j . &  (j\neq k,l) 
\end{aligned}
\right.
$$
That is, a Givens rotation applied to $\bx$ rotates two components of $\bx$ by some angle $\theta$, while keeping all other components unchanged.

Now, let's consider a Givens rotation of order $(n+1)$, where the rotation is indexed from $0$ to $n$. This rotation can be expressed as 
$$
\bG_k \triangleq \bI + (c_k-1)(\bdelta_0\bdelta_0^\top + \bdelta_k\bdelta_k^\top) + s_k(\bdelta_0\bdelta_k^\top -\bdelta_k\bdelta_0^\top ),
$$
where $c_k = \cos \theta_k, s_k=\sin\theta_k$ for some $\theta_k$, $\bG_k \in \real^{(n+1)\times (n+1)}$, and $\bdelta_k\in \real^{n+1}$ is a zero vector except that the $(k+1)$-th entry is 1.

\begin{mdframed}[hidealllines=\mdframehideline,backgroundcolor=\mdframecolor]
Taking out the $k$-th column of the following equation 
$$
\begin{bmatrix}
\bv^\top \\
\bR
\end{bmatrix}
\longrightarrow 
\begin{bmatrix}
\bzero \\
\bR^\prime
\end{bmatrix},
$$
where we let the $k$-th element of $\bv$ be $v_k$, and the $k$-th diagonal of $\bR$ be $r_{kk}$.
We realize that $\sqrt{v_k^2 + r_{kk}^2} \neq 0$,
and let $c_k \triangleq \frac{r_{kk}}{\sqrt{v_k^2 + r_{kk}^2}}$, $s_k\triangleq-\frac{v_k}{\sqrt{v_k^2 + r_{kk}^2}}$. Then we have
$$ 
\left\{
\begin{aligned}
&v_k \rightarrow c_kv_k+s_kr_{kk}=0;   \\
&r_{kk}\rightarrow -s_k v_k +c_kr_{kk}= \sqrt{v_k^2 + r_{kk}^2} = r^\prime_{kk} .  \\
\end{aligned}
\right.
$$
That is, $\bG_k$ will introduce a zero value to the $k$-th element of $\bv$ and a nonzero value to $r_{kk}$.
\end{mdframed}
This finding above is crucial for the rank-one update. And we obtain 
$$
\bG_n \bG_{n-1}\ldots \bG_1 
\begin{bmatrix}
\bv^\top \\
\bR
\end{bmatrix}
=
\begin{bmatrix}
\bzero \\
\bR^\prime
\end{bmatrix}.
$$
For each Givens rotation, it takes $6n$ flops. And there are $n$ such rotations, which requires $6n^2$ flops if keeping only the leading term. The complexity to calculate the Cholesky factor of $\bA^\prime$ is thus reduced from $\frac{1}{3} n^3$ to $6n^2$ flops using the rank-one update, provided that we already know the Cholesky factor of $\bA$. 

The above algorithm is also essential for reducing the complexity in the posterior calculation of Bayesian inference for Gaussian mixture model \citep{lu2021bayes}. At each stage, $k$ new samples are added or removed from an existing cluster, which corresponds to applying $k$ rank-one updates.

\subsection*{Rank-One Downdate}\index{Rank-one downdate}
Let us consider the scenario where we have computed the Cholesky factor of $\bA$, and $\bA^\prime$ is the rank-one downdate of $\bA$ given by the following expression:
\begin{equation*}
\begin{aligned}
\bA^\prime &= \bA - \bv \bv^\top;\\
\downarrow &\gap  \downarrow\\
\bR^{\prime\top}\bR^\prime &= \bR^\top\bR - \bv \bv^\top. 
\end{aligned}
\end{equation*}
The algorithm is similar by proceeding as follows:
\begin{equation}\label{equation:rank-one-downdate}
\bG_1 \bG_{2}\ldots \bG_n
\begin{bmatrix}
\bzero \\
\bR
\end{bmatrix}
=
\begin{bmatrix}
\bv^\top \\
\bR^\prime
\end{bmatrix}.
\end{equation}
Again, the set of Givens rotations $
\bG_k = \bI + (c_k-1)(\bdelta_0\bdelta_0^\top + \bdelta_k\bdelta_k^\top) + s_k(\bdelta_0\bdelta_k^\top -\bdelta_k\bdelta_0^\top )
$
for $k\in\{1,2,\ldots,n\}$
can be constructed as follows:
\begin{mdframed}[hidealllines=\mdframehideline,backgroundcolor=\mdframecolor]
Taking out the $k$-th column of the following equation 
$$
\begin{bmatrix}
\bzero \\
\bR
\end{bmatrix}
\longrightarrow
\begin{bmatrix}
\bv^\top \\
\bR^\prime
\end{bmatrix}.
$$
We realize that $r_{kk} \neq 0$,
and let $c_k\triangleq\frac{\sqrt{r_{kk}^2 - v_k^2}}{r_{kk}}$, $s_k \triangleq \frac{v_k}{r_{kk}}$. Then, we have
$$ 
\left\{
\begin{aligned}
& 0 \rightarrow s_kr_{kk}=v_k;   \\
&r_{kk}\rightarrow c_k r_{kk}= \sqrt{r_{kk}^2-v_k^2  }=r^\prime_{kk} .  \\
\end{aligned}
\right.
$$
This requires $r^2_{kk} > v_k^2$ to make $\bA^\prime$ to be positive definite. Otherwise, $c_k$ as defined above will not exist.
\end{mdframed}
Again, one can verify that, multiplying the left-hand side of Equation~\eqref{equation:rank-one-downdate} by its transpose, we have 
$$
\begin{bmatrix}
\bzero & \bR^\top
\end{bmatrix}
\bG_n^\top\ldots \bG_{2}^\top\bG_1^\top
\bG_1 \bG_{2}\ldots \bG_n
\begin{bmatrix}
\bzero \\
\bR
\end{bmatrix} =\bR^\top\bR.
$$
And multiplying the r.h.s. by its transpose, we have 
$$
\begin{bmatrix}
\bv & \bR^{\prime\top} 
\end{bmatrix}
\begin{bmatrix}
\bv^\top \\
\bR^\prime
\end{bmatrix}=\bv\bv^\top + \bR^{\prime\top}\bR^\prime.
$$
This results in $\bR^{\prime\top}\bR^\prime = \bR^\top\bR - \bv \bv^\top$.

\index{Rank-revealing}\index{Semidefinite rank-revealing}
\subsection{Rank-Deficient Case}

If the columns of $ \bX \in \real^{n\times p} $ are linearly dependent,  then $\rank(\bX ) = r < p$, and the matrix appearing in the normal equation $ \bX^\top \bX  $ is positive semidefinite. 
In this case, the Cholesky factor $ \bR $ must have $ n - r $ zero diagonal elements.
By employing symmetric pivoting during the factorization process, these zero entries can be arranged to appear at the end of the diagonal.
\begin{theoremHigh}[Semidefinite Cholesky decomposition, a.k.a., semidefinite rank-revealing decomposition]\label{theorem:semi_cholesky}
Let $ \bA \in \real^{n\times n} $ be a symmetric positive semidefinite matrix of rank $ r < n $. Then it can be  factored as 
$$
\bP^\top \bA  \bP = \bR^\top \bR, 
\quad \text{with}\quad  
\bR = 
\begin{bmatrix}
\bR_{11} & \bR_{12} \\
\bzero & \bzero
\end{bmatrix},
$$
where $\bP$ is a permutation matrix, and $ \bR_{11} \in \real^{r \times r} $ is upper triangular with positive diagonal elements.
Although such decompositions for $\bA$ is not unique, the decomposition for $\bP^\top\bA\bP$ is unique.
\end{theoremHigh}
\begin{proof}[of Theorem~\ref{theorem:semi_cholesky}]
The proof is constructive and follows a similar approach to the second perspective used for computing the Cholesky decomposition, as described in Section~\ref{section:ls_cholesky}.
The algorithm begins with  $ \bA^{(1)} = \bA  $ and generates a sequence of matrices defined as
$$
\bA^{(k)} = [a_{ij}^{(k)}] = 
\begin{bmatrixfoot}
\bI_{k-1} & \bzero \\
\bzero & \bB^{(k)}
\end{bmatrixfoot}, \quad k = 1, 2, \ldots,\quad\text{with } \bB^{(k)}\in\real^{(n-k+1)\times (n-k+1)}.
$$
At the beginning  of step $ k $ we select the largest diagonal element of $ \bA^{(k)} $,
$$
s_q^{(k)} = \max_{k \leq i \leq n} a_{ii}^{(k)},
$$
and interchange rows and columns $q$ and $k$ to bring this into pivot position; that is, $s_p$ appears in the ($k-1,k-1$) position of $\bP^{(k)\top}\bA^{(k)}\bP^{(k)}$, where the permutation matrix $\bP^{(k)}$ has the form 
$$
\bP^{(k)}\triangleq \begin{bmatrixfoot}
\bI_{k-1} & \bzero \\
\bzero & \widetildebP^{(k)}
\end{bmatrixfoot},
$$ 
and $\widetildebP^{(k)}\in\real^{(n-k+1)\times (n-k+1)}$ is a smaller permutation matrix. 
This pivot must be positive for $k < r$, because otherwise $\bB^{(k)} = \bzero$, which implies that $\rank(\bA) < r$. Next, the elements in the permuted $\bA^{(k)}$ are transformed according to the Cholesky Algorithm~\ref{alg:compute-choklesky11}:
$$
\begin{aligned}
r_{kk} &= \sqrt{a_{kk}^{(k)}}, \quad r_{kj} = a_{kj}^{(k)} / r_{kk}, \quad j = k + 1:n,\\
a_{ij}^{(k+1)} &= a_{ij}^{(k)} - r_{ki} r_{kj}, \quad i,j = k + 1:n.
\end{aligned}
$$
This process is equivalent to subtracting a symmetric rank-one matrix $\br_j \br_j^\top$ from $\bA^{(k)}$, where $\br_j = \be_j^\top \bR$ is the $j$-th row of $\bR$. The algorithm stops when $k = r + 1$. Then all the remaining diagonal elements are zero, which implies that $\bA^{(r+1)} = \begin{bmatrixfoot}
\bI_r & \bzero \\
\bzero & \bzero 
\end{bmatrixfoot}$.

\paragraph{Construction algorithm.} Below contains more constructive analysis.
Following the  second perspective for computing the Cholesky decomposition in Section~\ref{section:ls_cholesky}, 
we can construct 
$$
\bP^{(k)\top}\bA^{(k)} \bP^{(k)}\triangleq
\begin{bmatrixfoot} 
\bI_{k-1} & 0 & \bzero \\ 
0 & a_{kk}^{(k)} & \bb_k^\top \\ 
\bzero & \bb_k & \bB^{(k)} 
\end{bmatrixfoot} 
\qquad\text{and}\qquad
\bL^{(k)} = 
\begin{bmatrixfoot} 
\bI_{k-1} & 0 & \bzero \\ 
0 & \sqrt{a_{kk}^{(k)}} & \bzero \\ 
\bzero & \frac{1}{\sqrt{a_{kk}^{(k)}}}\bb_k & \bI_{n-k} 
\end{bmatrixfoot},
$$
satisfying  $\bP^{(k)\top}\bA^{(k)}\bP^{(k)} = \bL^{(k)}\bA^{(k+1)}(\bL^{(k)})^\top$:
$$
\begin{aligned} \bA^{(k+1)} 
&= \begin{bmatrixfoot} 
\bI_{k-1} & 0 & \bzero \\ 
0 & 1 & \bzero \\ 
\bzero & \bzero & \bB^{(k)}-\frac{1}{a_{kk}^{(k)}}\bb_k\bb_k^\top 
\end{bmatrixfoot} 
\triangleq 
\begin{bmatrixfoot} 
\bI_k & 0 & \bzero \\ 
0 & a_{k+1, k+1}^{(k+1)} & \bb_{k+1}^\top \\ 
\bzero & \bb_{k+1} & \bB^{(k+1)} 
\end{bmatrixfoot}.
\end{aligned}
$$

However, we notice that these permutation matrices $\bP^{(1)}, \bP^{(2)}, \ldots, \bP^{(r)}$ are used to permute two columns; therefore, they are symmetric satisfying $\bP^{(k)}\cdot \bP^{(k)} = \bI$ for all $k$.
Let $\bP\triangleq \bP^{(1)}\bP^{(2)}\ldots\bP^{(r)}$. 
Since $(\bA^{(r+1)})^2=\bA^{(r+1)}$, $\bA^{(1)}=\bA$ can be expressed as 
\begin{align}
&\bP^\top\bA^{(1)}\bP \triangleq \bL\bL^\top;\\
&\bL\triangleq \left\{\bP^{(r)}\bP^{(r-1)}\ldots \bP^{(2)} \bP^{(1)}\right\}
\left\{\bP^{(1)}\bL^{(1)} \right\} \left\{\bP^{(2)}\bL^{(2)} \right\}
\ldots \left\{\bP^{(r)}\bL^{(r)} \right\} \bA^{(r+1)}. \label{equation:rrcho_low1}
\end{align}
To complete the proof, it suffice to show that $\bL$ is lower triangular with the rank-revealing property.
On the other hand, each lower triangular $\bL^{(k)}$ can be written as 
$$
\bL^{(k)} = 
\bI - \bl_k \be_k^\top
\quad \text{with}\quad \bl_k = [\bzero_{k-1}, l_{k}, l_{k+1}, \ldots, l_n]^\top,
$$
where $\be_k$ is the $k$-th standard unit basis, and $\bl_k$ is a vector containing $k-1$ zeros. Note that $1-l_k \equiv \sqrt{a_{kk}^{(k)}}$ in this notation.
For $k\in\{1,2,\ldots,r-1\}$, define 
$$
\begin{aligned}
\bM_k&\triangleq \bP^{(r)}\bP^{(r-1)}\ldots \bP^{(k+1)}\bL^{(k)}\bP^{(k+1)}\ldots \bP^{(r-1)}\bP^{(r)}\\
&=\bP^{(r)}\bP^{(r-1)}\ldots \bP^{(k+1)} (\bI - \bl_k\be_k^\top)\bP^{(k+1)}\ldots \bP^{(r-1)}\bP^{(r)}\\
&=\bI - (\bP^{(r)}\bP^{(r-1)}\ldots \bP^{(k+1)} \bl_k) (\be_k^\top\bP^{(k+1)}\ldots \bP^{(r-1)}\bP^{(r)})\\
&=\bI - (\bP^{(r)}\bP^{(r-1)}\ldots \bP^{(k+1)} \bl_k) \be_k^\top,
\end{aligned}
$$
where the last equality follows since  $\be_k^\top\bP^{(k+1)}\ldots \bP^{(r-1)}\bP^{(r)}=\be_k^\top$.
This implies $\bM_k$ is lower triangular with its $k$-th column representing a permuted version of $\bL^{(k)}$.
Therefore, it holds that 
$$
\begin{aligned}
&\bM_1 \bM_2\ldots \bM_{r-1} = 
\left\{\bP^{(r)}\bP^{(r-1)}\ldots \bP^{(2)}\right\} \left\{\bL^{(1)} \bP^{(2)}\right\} 
\left\{\bL^{(2)}\bP^{(3)}\right\} \ldots \left\{\bL^{(r-1)}\bP^{(r)}\right\};\\
&\bL \equiv \bM_1 \bM_2\ldots \bM_{r-1} \bL^{(r)}\bA^{(r+1)}.
\end{aligned}
$$
From the above analysis, $\bM_1 \bM_2\ldots \bM_{r-1}$ is lower triangular, and $\bL^{(r)}\bA^{(r+1)}$ has the form
$$
\bL^{(r)}\bA^{(r+1)}
=
\begin{bmatrixfoot}
\bM_{11} &\bzero \\
\bM_{21} &\bzero
\end{bmatrixfoot},
\quad \text{with lower triangular $\bM_{11}$}. 
$$
Therefore, $\bL = \bM_1 \bM_2\ldots \bM_{r-1} \bL^{(r)}\bA^{(r+1)}$ has the desired form
$$
\bL=
\begin{bmatrixfoot}
\bL_{11} &\bzero \\
\bL_{21} &\bzero
\end{bmatrixfoot},
\quad 
\text{with lower triangular $\bL_{11}$}. 
$$
This completes the proof.
\end{proof}

Since all the reduced matrices $\bA^{(k)}$ are symmetric positive semidefinite, their maximum elements lie on the diagonal (Corollary~\ref{corollary:sylt3}). 
Hence, the pivot selection the proof  described above is equivalent to \textit{complete pivoting}. The algorithm produces a matrix $\bR$ whose diagonal elements in $\bR$ form a nonincreasing sequence 
\begin{equation}
r_{11} \geq r_{22} \geq \ldots \geq r_{nn}.
\end{equation}
In fact, the following stronger inequalities also hold:
\begin{equation}
r_{kk}^2 \geq \sum_{i=k}^j r_{ii}^2, \qquad j = k + 1, \ldots, n, \ k = 1,2,\ldots,r;
\end{equation}
\textcolor{black}{see Section~\ref{section:piv_qr}}.

The proof given above is constructive and can be directly used to compute the semidefinite Cholesky decomposition; see Algorithm~\ref{alg:semid_cholesky}.
However, this approach may appear somewhat complicated. A more concise proof is presented below using the spectral decomposition (Theorem~\ref{theorem:spectral_theorem}) and the column-pivoted QR decomposition (which will be introduced in Theorem~\ref{theorem:rank-revealing-qr-general}).

\begin{proof}[of Theorem~\ref{theorem:semi_cholesky}: an alternative proof]
The ``nonsingular" factor of the PSD matrix $\bA$ is given by $\bA = \bZ^\top\bZ$, where $\bZ=\bLambda^{1/2}\bQ^\top$, and $\bA=\bQ\bLambda\bQ^\top$ is the spectral decomposition of $\bA$.
The rank of matrix $\bA$ is the number of nonzero eigenvalues (here, the number of positive eigenvalues since $\bA$ is PSD). Therefore, only $r$ components in $\bLambda^{1/2}$ are nonzero, and $\bZ=\bLambda^{1/2}\bQ^\top$ contains only $r$ independent columns, i.e., $\bZ$ is of rank $r$. By utilizing the column-pivoted QR decomposition, we have
$$
\bZ\bP = \bQ
\begin{bmatrix}
\bR_{11} & \bR_{12} \\
\bzero   & \bzero 
\end{bmatrix},
$$
where $\bP$ is a permutation matrix, $\bR_{11}\in \real^{r\times r}$ is upper triangular with positive diagonals, and $\bR_{12}\in \real^{r\times (n-r)}$. Therefore,
$$
\bP^\top\bA\bP  = 
\bP^\top\bZ^\top\bZ\bP = 
\begin{bmatrix}
\bR_{11}^\top & \bzero \\
\bR_{12}^\top & \bzero 
\end{bmatrix}
\begin{bmatrix}
\bR_{11} & \bR_{12} \\
\bzero   & \bzero 
\end{bmatrix},
\gap \text{with}\gap 
\bR \triangleq \begin{bmatrix}
\bR_{11} & \bR_{12} \\
\bzero   & \bzero 
\end{bmatrix}.
$$
Thus, we find the rank-revealing decomposition for the semidefinite matrix: $\bP^\top\bA\bP = \bR^\top\bR$.
\end{proof}

\begin{algorithm}[h] 
\caption{Semidefinite Cholesky Decomposition}  
\label{alg:semid_cholesky} 
\begin{algorithmic}[1] 
\Require 
Positive semidefinite matrix $\bA$ with size $n\times n$; 
\State Set $ p_i = i $, $ i = 1,2,\ldots, n $;
\State $\bR\leftarrow \bA$
\For{$k=1$ to $n$}  
\State Find $ s $ such that $ r_{ss} = \max_{k \leq i \leq n} r_{ii} $;
\State Swap rows and columns $ k $ and $ s $ of $ \bA $ and swap $ p_k $ and $ p_s $;
\State $ r_{kk} = \sqrt{r_{kk}} $;
\For{$j=k + 1$ to $n$}
\State $ r_{kj} = r_{kj} / r_{kk} $;
\EndFor
\For{$j=k + 1$ to $n$}
\For{$i=k + 1$ to $j$}
\State $ r_{ij} = r_{ij} - r_{ki} r_{kj} $;
\EndFor
\EndFor
\EndFor
\State  Set $ \bP $ to the matrix whose $ j $-th column is the $ p_j $-th column of $ \bI $;
\State Output $\bP, \bR$.
\end{algorithmic} 
\end{algorithm}

Rounding errors can cause negative elements to appear on the diagonal in the Cholesky algorithm, even if $\bZ$ is positive semidefinite. Similarly, the computed reduced matrix will in general be nonzero after $r$ steps even when $\rank(\bZ) = r$. 
This situation raises questions about the appropriate time to terminate the Cholesky factorization of a semidefinite matrix. One approach is to stop the process when
$$
\max_{k \leq i \leq n} a_{ii}^{(k)} \leq 0,
$$
and set $\rank(\bZ) = k - 1$. 
However, this approach might lead to unnecessary computations in eliminating negligible elements. Considering the computational cost, we recommend using the following stopping criterion:
$$
\max_{k \leq i \leq n} a_{ii}^{(k)} \leq c_n \epsilon r_{11}^2,
$$
where $\epsilon$ denotes the unit roundoff error, $c_n$ is a modest constant \citep{higham1989accurate, higham2002accuracy}.

\index{Normal equation}
\paragrapharrow{Rank-deficient normal equation.}
In the context of least squares problem with the normal equation $\bX^\top\bX\bbeta = \bX^\top\by$, where $\bP^\top\bX^\top\bX\bP = \bR^\top\bR$ is the semidefinite Cholesky decomposition.
In the rank-deficient case, the permuted normal equation become
$$
\bR^\top \bR \widetildebbeta = \widetildebalpha, \qquad \bbeta = \bP \widetildebbeta, \qquad \widetildebalpha = \bP^\top (\bX^\top \by).
$$
With $\widetildebu \triangleq \bR \widetildebbeta$, we obtain
$$
\bR^\top \widetildebu 
\triangleq 
\begin{bmatrix}
\bR_{11}^\top & \bzero \\
\bR_{12}^\top & \bzero 
\end{bmatrix} 
\begin{bmatrix}
\widetildebu_1 \\
\widetildebu_2
\end{bmatrix}
 = \begin{bmatrix}
\widetildebalpha_1 \\
\widetildebalpha_2
\end{bmatrix}
\triangleq \widetildebalpha
,
$$
where $\bR_{11} \in \real^{r \times r}$ is nonsingular. The triangular system $\bR_{11}^\top \widetildebu_1 = \widetildebalpha_1$ determines $\widetildebu_1 \in \real^r$. From
\begin{equation}\label{equation:basic_def_cho1}
\bR_{11} \widetildebbeta_1 = \widetildebu_1 - \bR_{12} \widetildebbeta_2,
\end{equation}
where $\widetildebbeta = [\widetildebbeta_1^\top, \widetildebbeta_2^\top]^\top$, we can determine $\widetildebbeta_1$ for an arbitrarily chosen $\widetildebbeta_2$. 
This reflects the fact that a consistent singular system has infinitely many solutions. 
Finally, the permutations are undone to determine $\bbeta = \bP \widetildebbeta$.

Setting $\widetildebbeta_2 = \bzero$ we get a basic solution $\widehatbbeta_{r} = [\widehatbbeta_{r1}^\top, \bzero]^\top$ with only $r = \rank(\bX)$ nonzero components in $\bbeta$, corresponding to the first $r$ columns in $\bX \bP$. 
This is particularly useful when seeking a good least squares fit of $\by$ using as few variables as possible (variable selection; see Section~\ref{section:variable-selection}).

\index{Minimum-norm solution}
\paragrapharrow{Minimum-norm solution.}

The minimum-norm solution $\widehatbbeta=\bX^+\by$ is the one that minimizes $\normtwo{\bbeta} = \normtwobig{\widetildebbeta}$. 
After obtaining the basic solution $\widehatbbeta_{r}$, $\widetildebu_1$ can be determined from \eqref{equation:basic_def_cho1} as $\widetildebu_1 = \bR_{11} \widehatbbeta_{r1} $.
This again shows that any solution  $\widetildebbeta = [\widetildebbeta_1^\top, \widetildebbeta_2^\top]^\top$ satisfies
\begin{equation}
\widetildebu_1 = \bR_{11} \widehatbbeta_{r1} 
=
\bR_{11} \widetildebbeta_1 + \bR_{12} \widetildebbeta_2
\qquad \implies\qquad 
\widetildebbeta_1 =  \widehatbbeta_{r1}  - \bR_{11}^{-1}\bR_{12} \widetildebbeta_2.
\end{equation}
Thus,   the minimum-norm of $\widetildebbeta$ can be obtained from the full-rank least squares problem
\begin{equation}\label{equation:qrfulo_minnom}
\min_{\widetildebbeta_2\in\real^{p-r}} 
\normtwo{\begin{bmatrix} 
\bF  \\ 
-\bI_{p-r} 
\end{bmatrix} 
\widetildebbeta_2 - 
\begin{bmatrix} 
\widehatbbeta_{r1} \\ \bzero
\end{bmatrix}
}, 
\qquad \bF \triangleq \bR_{11}^{-1} \bR_{12}\in\real^{r\times (p-r)}.
\end{equation}
The basic solution $\widehatbbeta_{r1}$ can be computed in about $r^2(p-r)$ flops. Note that $\bF$ can overwrite $\bR_{12}$. Then $\bbeta_2$ can be computed from the normal equation,
$$
(\bF^\top \bF + \bI_{p-r}) \widetildebbeta_2 = \bF^\top \widehatbbeta_{r1},
$$
using a Cholesky decomposition of $(\bF^\top \bF + \bI_{p-r})$. When $\widetildebbeta_2$ has been determined, we have $\widetildebbeta_1 = \widehatbbeta_{r1} - \bF \widetildebbeta_2$. This method requires about $r(p-r)^2 + \frac{1}{2}(p-r)^3$ flops \citep{deuflhard1980rank}.
The final solution once again involves an undo of the permutation $\bP \widetildebbeta$.

\index{Full column rank}
\index{QR decomposition}
\section{LS via  QR Decomposition}\label{section:ls_qr_gen}

\subsection{Full Rank Case}\label{section:application-ls-qr}

We begin by presenting the least squares solution for the full-rank case using QR decomposition. Other related problems---such as the minimum-norm problem---will be discussed later.
In the least squares solution (Theorem~\ref{theorem:ols}), 
computing the inverse of $\bX^\top\bX$ can be numerically unstable or computationally intensive. To avoid this, we can instead use the QR decomposition to compute the least squares solution more efficiently and accurately, as shown in the following theorem.

\begin{theoremHigh}[LS via QR for full column rank matrix]\label{theorem:qr-for-ls}
Let $\bX\in\real^{n\times p}$ with $n\geq p$ and full column rank, and suppose  $\bX=\bQ\bR$ is its full QR decomposition, where $\bQ\triangleq[\bQ_1,\bQ_2]\in\real^{n\times n}$ ($\bQ_1\in\real^{n\times p}$ and $\bQ_2\in\real^{n\times (n-p)}$) is an  orthogonal matrix, $\bR= \begin{bmatrixfoot}
	\bR_1 \\
	\bzero
\end{bmatrixfoot}\in \real^{n\times p}$ is an upper triangular matrix appended by additional $n-p$ zero rows, and $\bR_1 \in \real^{p\times p}$ is the square upper triangular matrix within $\bR$. 
Then for any response vector $\by\in \real^n$, the LS solution to $\bX\bbeta=\by$ is given by 
$$
\widehatbbeta = \bR_1^{-1}\bc_1, 
\quad \text{with }
\begin{bmatrix}
\bc_1 \\
\bc_2 
\end{bmatrix}
\triangleq \bQ^\top\by,
$$
where $\bc_1$ contains the first $p$ components of $\bQ^\top\by$.
The error component is 
$$
\be = \by-\bX\widehatbbeta = \bQ
\begin{bmatrix}
\bzero \\
\bc_2
\end{bmatrix}.
$$
\end{theoremHigh}

\begin{proof}[of Theorem~\ref{theorem:qr-for-ls}]
Since $\bX=\bQ\bR$ is the full QR decomposition of $\bX$ and $n\geq p$, the last $n-p$ rows of $\bR$ are zero as shown in Figure~\ref{fig:qr-comparison}. Then, $\bR_1 \in \real^{p\times p}$ is the square upper triangular in $\bR$ and 
$
\bQ^\top \bX = \bR = 
\footnotesize
\begin{bmatrix}
\bR_1 \\
\bzero
\end{bmatrix}.
$
Write out the loss function,
$$
\begin{aligned}
\normtwo{\by-\bX\bbeta}^2 
&\stackrel{\ddag}{=}\normtwo{\bQ^\top \bX \bbeta-\bQ^\top\by}^2 
=\normtwo{
\begin{bmatrixfoot}
\bR_1 \\
\bzero
\end{bmatrixfoot} 
\bbeta-\bQ^\top\by}^2
=\normtwo{\bR_1\bbeta - \bc_1}^2+\normtwo{\bc_2}^2,
\end{aligned}
$$ 
where $\bc_1$ is the first $p$ components of $\bQ^\top\by$,  $\bc_2$ is the last $n-p$ components of $\bQ^\top\by$,  and the equality ($\ddag$) follows from the invariance under orthogonal transformations.
Then the OLS solution can be calculated by performing backward substitution on the upper triangular system $\bR_1\bbeta = \bc_1$, i.e., $\widehatbbeta = \bR_1^{-1}\bc_1$.
\end{proof}
\index{Floating-point operation}

Note that when $\rank(\bX) = p$, the pseudo-inverses of $\bX$ and $\bX^\top$ can be expressed in terms of the QR factorization as
$$
\bX^+ = \bR_1^{-1} \bQ_1^\top 
\qquad \text{and}\qquad 
(\bX^\top)^+ = \bQ_1 \bR_1^{-\top}.
$$
Since $\widehatbbeta=\bX^+\by$, we again obtain by Theorem~\ref{theorem:ols} that 
$$
\widehatbbeta=\bX^+\by=\bR_1^{-1} \bQ_1^\top\by, 
\quad \text{with }\bc_1\equiv  \bQ_1^\top\by.
$$

The inverse of an upper triangular matrix $\bR_1\in \real^{p\times p}$ requires $\frac{1}{3}p^3$ flops. However, the inverse of a basic $p\times p$ nonsingular matrix (in our case, the inverse of $\bX^\top\bX$) requires $2p^3$ flops \citep{lu2021numerical, lu2022matrix}. 
Therefore, using QR decomposition for OLS instead of directly inverting matrices significantly reduces computational complexity.

%

\index{QR decomposition}
\index{Gram-Schmidt}
\index{CGS}

\subsection{Gram-Schmidt QR}\label{section:qr-gram-compute}
Although the Householder (or Givens; see, for example, \citet{lu2021numerical} for more details) algorithm is more commonly used to compute the QR decomposition, we also present the Gram-Schmidt process. This method serves as the foundation for the elliptic MGS approach used in solving generalized least squares problems; see Section~\ref{section:gls_ellipmgs} for further details.

\subsection{Classical Gram-Schmidt (CGS) Process}

We express the reduced QR decomposition in the form $\bX = \bQ\bR$, where $\bQ\in \real^{n\times p}$ and $\bR\in \real^{p\times p}$.
The semi-orthogonal matrix $\bQ$ can be computed efficiently using the \textit{Gram-Schmidt process}.
Extending the concept from  Equation~\eqref{equation:gram-schdt-eq2} to the $k$-th term, we obtain
$$
\begin{aligned}
\bx_k &= \sum_{i=1}^{k-1}(\bq_i^\top\bx_k)\bq_i + \bx_k^\perp 
= \sum_{i=1}^{k-1}(\bq_i^\top\bx_k)\bq_i + \normtwo{\bx_k^\perp}\cdot \bq_k, 
\end{aligned}
$$
indicating that we can gradually orthonormalize $\bX$ to obtain an orthonormal set $\bQ=[\bq_1, \bq_2, \ldots, \bq_p]$ by 
\begin{equation}\label{equation:qr-gsp-equation}
\left\{
\begin{aligned}
r_{ik} &= \bq_i^\top\bx_k, \,\,\,\,\forall\, i \in \{1,2,\ldots, k-1\};\\ 
\bx_k^\perp&\triangleq \bx_k-\sum_{i=1}^{k-1}r_{ik}\bq_i;\\
r_{kk} &= \normtwo{\bx_k^\perp};\\
\bq_k &= \bx_k^\perp/r_{kk}.
\end{aligned}
\right.
\end{equation}
The procedure is summarized  in Algorithm~\ref{alg:reduced-qr}.
\begin{algorithm}[h] 
\caption{Reduced QR Decomposition via Gram-Schmidt Process} 
\label{alg:reduced-qr} 
\begin{algorithmic}[1] 
\Require Matrix $\bX$ has linearly independent columns with size $n\times p $ and $n\geq p$; 
\For{$k=1$ to $p$} \Comment{compute the $k$-th column of $\bQ,\bR$}
\For{$i=1$ to $k-1$}
\State $r_{ik} \leftarrow\bq_i^\top\bx_k$;              
\EndFor                                                 
\State $\bx_k^\perp\leftarrow  \bx_k-\sum_{i=1}^{k-1}r_{ik}\bq_i$;     
\State $r_{kk} \leftarrow \normtwo{\bx_k^\perp}$;       
\State $\bq_k \leftarrow \bx_k^\perp/r_{kk}$;           
\EndFor
\State Output $\bQ=[\bq_1, \ldots, \bq_p]$ and $\bR$ with entry $(i,k)$ being $r_{ik}$.
\end{algorithmic} 
\end{algorithm}

\begin{theoremHigh}[Algorithm complexity: reduced QR via Gram-Schmidt \citep{lu2021numerical}]\label{theorem:qr-reduced}
Algorithm~\ref{alg:reduced-qr} requires $\sim 2np^2$ flops to compute the reduced QR decomposition of an $n\times p$ matrix with linearly independent columns and $n\geq p$.
\end{theoremHigh}

\index{Orthogonal projection}
\index{Projection matrix}
\index{Projector}
\subsection*{{\textbf{Orthogonal Projection: Preliminary for MGS}}}
Upon revisiting  Equation~\eqref{equation:qr-gsp-equation}, i.e., step 2 to step 6 in Algorithm~\ref{alg:reduced-qr}, we observe that the first two equalities imply that
\begin{equation}\label{equation:qr-gsp-equation2}
\left.
\begin{aligned}
r_{ik} &= \bq_i^\top\bx_k, \,\forall\, i \in \{1,2,\ldots, k-1\}\\ 
\bx_k^\perp&\triangleq \bx_k-\sum_{i=1}^{k-1}r_{ik}\bq_i\\
\end{aligned}
\right\}
\rightarrow 
\bx_k^\perp= \bx_k- \bQ_{k-1}\bQ_{k-1}^\top \bx_k=(\bI-\bQ_{k-1}\bQ_{k-1}^\top )\bx_k,
\end{equation}
where $\bQ_{k-1}\triangleq[\bq_1,\bq_2,\ldots, \bq_{k-1}]$. This implies that $\bq_k$ can be obtained by 
$$
\bq_k = \frac{\bx_k^\perp}{\normtwo{\bx_k^\perp}} = \frac{(\bI-\bQ_{k-1}\bQ_{k-1}^\top )\bx_k}{\normtwo{(\bI-\bQ_{k-1}\bQ_{k-1}^\top )\bx_k}}.
$$
The matrix $(\bI-\bQ_{k-1}\bQ_{k-1}^\top )$ in the above equation is known as an \textit{orthogonal projection matrix} that  projects $\bx_k$ \textbf{along} the column space of $\bQ_{k-1}$,
i.e., it projects a vector so that the projected vector is perpendicular to the column space of $\bQ_{k-1}$; see Section~\ref{section:by-geometry-hat-matrix}. 
Consequently, $\bx_k^\perp$ or $\bq_k$ calculated in this way will be orthogonal to $\cspace(\bQ_{k-1})$, residing in the null space of $\bQ_{k-1}^\top$, i.e., the space of $\nspace(\bQ_{k-1}^\top)$ by the fundamental theorem of linear algebra (Theorem~\ref{theorem:fundamental-linear-algebra}). 
\index{Fundamental theorem}

\index{Complementary projector}
Let $\bP_1\triangleq(\bI-\bQ_{k-1}\bQ_{k-1}^\top )$. It can be shown  that $\bP_1=(\bI-\bQ_{k-1}\bQ_{k-1}^\top )$ is an orthogonal projection matrix such that $\bP_1\bv$ will project the vector $\bv$ onto the null space of $\bQ_{k-1}^\top$. 
Additionally, let $\bP_2\triangleq\bQ_{k-1}\bQ_{k-1}^\top$; then, $\bP_2$ is also an orthogonal projection matrix such that $\bP_2\bv$ will project the vector $\bv$ onto the column space of $\bQ_{k-1}$  (Proposition~\ref{proposition:orthogonal-projection}; $\bP_2$ is called a \textit{complementary projector} of $\bP_1$, vice versa).
Therefore, we conclude the presence of two orthogonal projections:
$$
\left\{
\begin{aligned}
\bP_1&=\bI-\bP_2: &\gap& \text{project onto $\nspace(\bQ_{k-1}^\top)$, \textbf{along} the column space of $\bQ_{k-1}$;} \\
\bP_2&=\bQ_{k-1}\bQ_{k-1}^\top: &\gap& \text{project onto $\cspace(\bQ_{k-1})$, \textbf{onto} the column space of $\bQ_{k-1}$} .
\end{aligned}
\right.
$$

An additional result to note arises when the columns of $\bQ_{k-1}$ are mutually orthonormal. In this case, we can observe the following decomposition:
\begin{equation}\label{equation:qr-orthogonal-equality}
\bP_1 = \bI - \bQ_{k-1}\bQ_{k-1}^\top = (\bI-\bq_1\bq_1^\top)(\bI-\bq_2\bq_2^\top)\ldots (\bI-\bq_{k-1}\bq_{k-1}^\top),
\end{equation}
where $\bQ_{k-1}=[\bq_1,\bq_2,\ldots, \bq_{k-1}]$ and each $(\bI-\bq_i\bq_i^\top)$ serves to project a vector onto the perpendicular space of $\bq_i$.

\index{MGS}
\subsection*{{\textbf{Modified Gram-Schmidt (MGS) Process}}}
To emphasize the modified Gram-Schmidt process and to make a connection to the equivalent projection in Equation~\eqref{equation:qr-orthogonal-equality}, we begin by illustrating a lemma that presents an alternative approach for obtaining the entries in the upper triangular matrix $\bR$ of the QR decomposition.
\begin{lemma}[Modified Gram-Schmidt]
Given a vector set $[\bx_1, \bx_2, \ldots,\bx_{k-1}, \bx_k$], where the first $k-1$ column are spanned by $k-1$ orthonormal vectors $[\bq_1, \bq_2, \ldots, \bq_{k-1}]$:
$$
\cspace([\bx_1, \bx_2, \ldots,\bx_{i}]) = \cspace([\bq_1, \bq_2, \ldots, \bq_{i}]), \gapforall \forall\, i\in \{1,2,\ldots, k-1\}. 
$$
Therefore, $r_{ik} = \bq_i^\top\bx_k$ represents the magnitude of the projection of $\bx_k$ on the vector $\bq_i$ (since $\bq_i$ is of unit length). Then it follows that 
$$
\begin{aligned}
\bq_i^\top\bx_k &= \bq_i^\top (\bx_k \underbrace{- r_{1k}\bq_1 - r_{2k}\bq_2 - \ldots - r_{i-1,k}\bq_{i-1}}_{\text{orthogonal to $\bq_i$}})\\
&= \bq_i^\top \big(\bx_k - \sum_{j=1}^{i-1}r_{jk}\bq_j \big), \gap \forall\, i\in \{1,2,\ldots, k-1\}.
\end{aligned}
$$
This can be easily verified since $\bq_i$ is orthonormal to $\{\bq_1, \bq_2, \ldots, \bq_{i-1}\}$. This observation implies another update for the $k$-th column of $\bR$. 
\end{lemma}
The lemma above reveals a second algorithm to compute the reduced QR decomposition of a matrix, as shown in Algorithm~\ref{alg:qr-mgs-right} of which the algorithm on the left is identical to Algorithm~\ref{alg:reduced-qr} (with slight modifications) to highlight the differences.

\noindent
\begin{minipage}[t]{0.495\linewidth}
\begin{algorithm}[H] 
\caption{CGS (=Algorithm~\ref{alg:reduced-qr})} 
\label{alg:qr-mgs-left}
\begin{algorithmic}[1] 
\Require $\bX\in \real^{n\times p}$ with full column rank;
\For{$k=1$ to $p$} 
\State $\bx_k^\perp\leftarrow\bx_k$;
\For{$i=1$ to $k-1$}
\State $\boxed{r_{ik} \leftarrow\bq_i^\top\bx_k}$;
\State $\bx_k^\perp\leftarrow \bx_k^\perp-r_{ik}\bq_i$; \,\,($\dagger$)
\EndFor 
\State $r_{kk} \leftarrow \normtwo{\bx_k^\perp}$; 
\State $\bq_k \leftarrow \bx_k^\perp/r_{kk}$; 
\EndFor 
\end{algorithmic} 
\end{algorithm}
\end{minipage}%
\hfil 
\begin{minipage}[t]{0.495\linewidth}
\begin{algorithm}[H] 
\caption{MGS}
\label{alg:qr-mgs-right}
\begin{algorithmic}[1] 
\Require $\bX\in \real^{n\times p}$ with full column rank;
\For{$k=1$ to $p$} 
\State $\bx_k^\perp\leftarrow\bx_k$;
\For{$i=1$ to $k-1$}
\State $\boxed{r_{ik} \leftarrow\bq_i^\top\textcolor{mylightbluetext}{\bx_k^\perp}}$;
\State $\bx_k^\perp\leftarrow \bx_k^\perp-r_{ik}\bq_i$; \,\,($\ast$)
\EndFor 
\State $r_{kk} \leftarrow \normtwo{\bx_k^\perp}$; 
\State $\bq_k \leftarrow \bx_k^\perp/r_{kk}$; 
\EndFor 
\end{algorithmic} 
\end{algorithm}
\end{minipage}
\index{Modified Gram–Schmidt}

The process described above is referred to as the \textit{modified Gram-Schmidt (MGS) process}, whereas the previous one is also known as the \textit{classical Gram-Schmidt (CGS) process}. In theory, both CGS and MGS are equivalent in the
sense that they compute exactly the same QR decompositions when exact arithmetic is employed. However, in practice, with the presence of round-off errors, the orthonormal columns of $\bQ$ computed by MGS tend to be ``more orthonormal" than those computed by CGS. 

To see the equivalence of the above two algorithms,
we note that the equality ($\dagger$) in Algorithm~\ref{alg:qr-mgs-left} is equivalent to 
\begin{equation}\label{equation:qr-orthogonal-equality2}
\bx_k^\perp = \bx_k - (\bq_1^\top\bx_k)\bq_1 - (\bq_2^\top\bx_k)\bq_2-\ldots -(\bq_{k-1}^\top\bx_k)\bq_{k-1} = (\bI - \bQ_{k-1}\bQ_{k-1}^\top) \bx_k.
\end{equation}
And
the equality ($\ast$) in Algorithm~\ref{alg:qr-mgs-right} can be reformulated as (via the step 4 and step 5 of the algorithm)
$$
\begin{aligned}
\bx_k^\perp &:= \bx_k^\perp-r_{ik}\bq_i
= \bx_k^\perp-(\bq_i^\top\bx_k^\perp)\bq_i
=\bx_k^\perp-\bq_i\bq_i^\top\bx_k^\perp
=(\bI-\bq_i\bq_i^\top)\bx_k^\perp.
\end{aligned}
$$
That is, $\bx_k^\perp$ will be updated by 
\begin{equation}\label{equation:qr-orthogonal-equality3}
\bx_k^\perp=
\left\{(\bI-\bq_{k-1}\bq_{k-1}^\top)\ldots\left[(\bI-\bq_2\bq_2^\top)\left((\bI-\bq_1\bq_1^\top) \bx_k\right)\right]\right\},
\end{equation}
where the nested parentheses in MGS denote the computation order.
The comparison of \eqref{equation:qr-orthogonal-equality2} and \eqref{equation:qr-orthogonal-equality3} matches the orthogonal projection matrix equality in Equation~\eqref{equation:qr-orthogonal-equality} that
$$
\begin{aligned}
\bP_1 &= \bI-\bQ_{k-1}\bQ_{k-1}^\top
=(\bI-\bq_1\bq_1^\top)(\bI-\bq_2\bq_2^\top)\ldots (\bI-\bq_{k-1}\bq_{k-1}^\top)
=\prod_{i=1}^{k-1}(\bI-\bq_i\bq_i^\top),
\end{aligned}
$$
where $\bQ_{k-1}=[\bq_1,\bq_2,\ldots, \bq_{k-1}]$.

\begin{figure}[H]
\centering  
\vspace{-0.35cm} 
\subfigtopskip=2pt 
\subfigbottomskip=2pt 
\subfigcapskip=-5pt 
\subfigure[CGS, step 1: \textcolor{mylightbluetext}{blue} vector; step 2: \textcolor{mydarkgreen}{green} vector; step 3: \textcolor{mydarkpurple}{purple} vector.]{\label{fig:projection-mgs-demons-cgs}
\includegraphics[width=0.4\linewidth]{./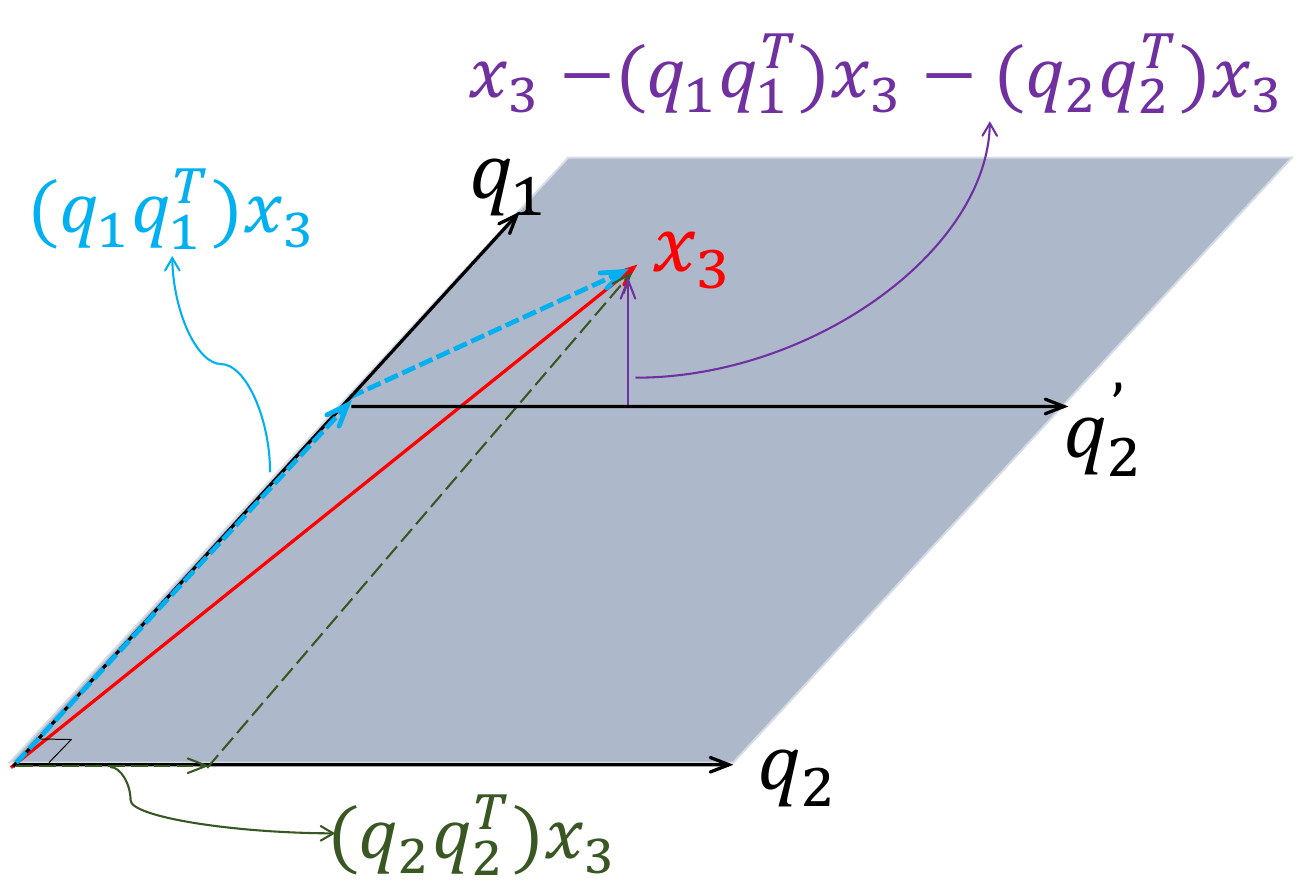}}
\quad 
\subfigure[MGS, step 1: \textcolor{mylightbluetext}{blue} vector; step 2: \textcolor{mydarkpurple}{purple} vector.]{\label{fig:projection-mgs-demons-mgs}
\includegraphics[width=0.4\linewidth]{./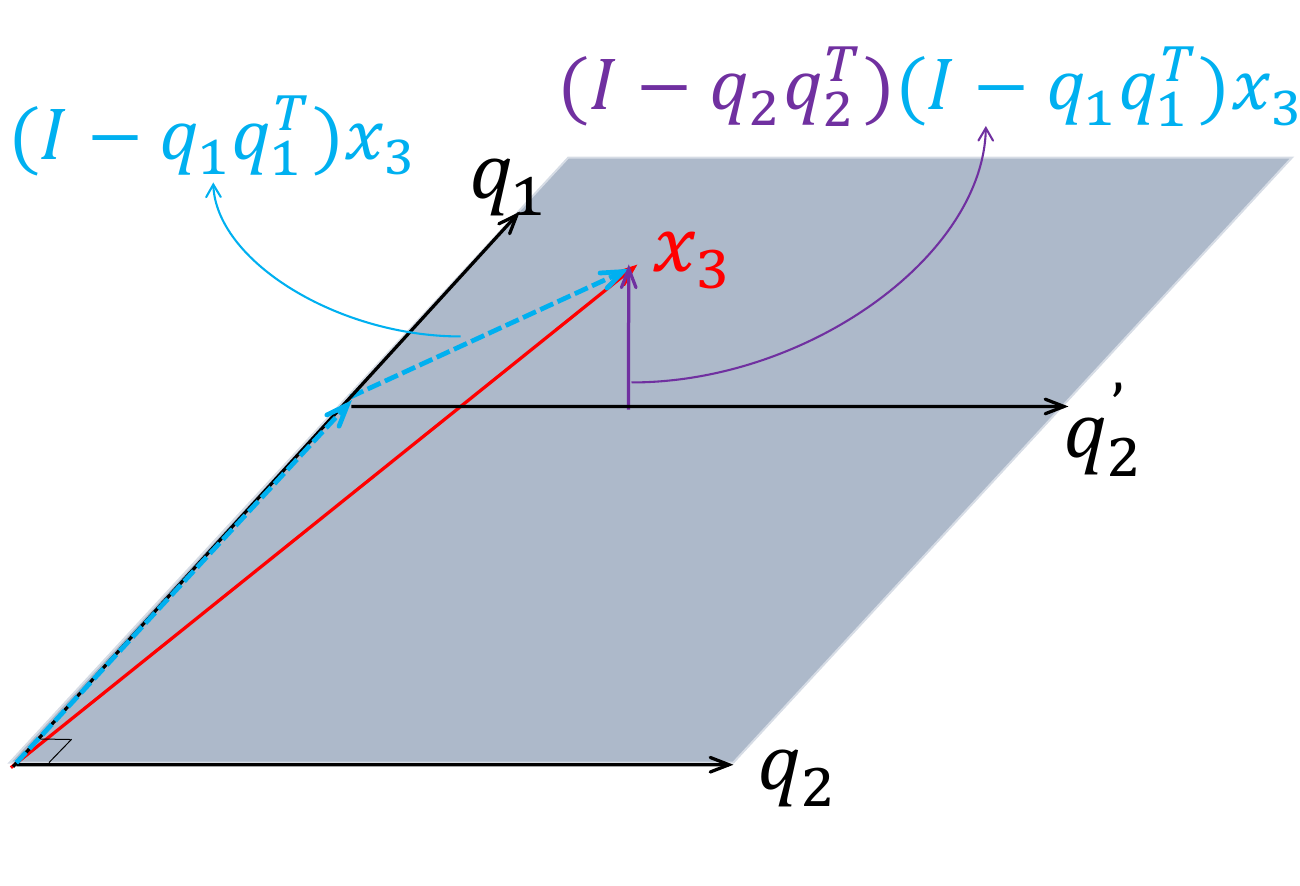}}
\caption{CGS vs MGS in three-dimensional space, where $\bq_2^\prime$ is parallel to $\bq_2$ so that projecting onto $\bq_2$ is equivalent to projecting onto $\bq_2^\prime$.}
\label{fig:projection-mgs-demons-3d}
\end{figure}

\paragrapharrow{What's the difference?}
Taking a three-column matrix $\bX=[\bx_1, \bx_2, \bx_3]$ as an example. Suppose we have already computed $\{\bq_1, \bq_2\}$ such that $\spn\{\bq_1, \bq_2\}=\spn\{\bx_1, \bx_2\}$, and we want to proceed to compute  $\bq_3$.

In the CGS algorithm, the orthogonalization of column $\bx_3$ against columns $\{\bq_1, \bq_2\}$ is achieved by projecting the original column $\bx_3$ of $\bX$ onto $\{\bq_1, \bq_2\}$, respectively, followed by subtracting these projections at once:
\begin{equation}\label{equation:cgs-3d-exmp}
\left\{
\begin{aligned}
\bx_3^\perp &= \bx_3 - (\bq_1^\top\bx_3)\bq_1 - (\bq_2^\top\bx_3)\bq_2\\
&= \bx_3 - (\bq_1\bq_1^\top)\bx_3 - \boxed{(\bq_2\bq_2^\top)\textcolor{mylightbluetext}{\bx_3}};\\
\bq_3 &=  {\bx_3^\perp}/{\normtwo{\bx_3^\perp}},
\end{aligned}
\right.
\end{equation}
as shown in Figure~\ref{fig:projection-mgs-demons-cgs}.

In the MGS algorithm, on the other hand, the components along each $\{\bq_1, \bq_2\}$ are immediately subtracted out of the rest of the column $\bx_3$ as soon as the vectors  $\{\bq_1, \bq_2\}$ are computed. 
Therefore, the orthogonalization of column $\bx_3$ against $\{\bq_1, \bq_2\}$ is not performed by projecting the original column $\bx_3$ against $\{\bq_1, \bq_2\}$ as it is in CGS, but rather against a vector obtained by subtracting from that column $\bx_3$ of $\bX$ the components in the direction of $\bq_1, \bq_2$ successively. This is important because the error components of $\bq_3$ in $\spn\{\bq_1, \bq_2\}$ will be smaller (we will  discuss this further in the following paragraphs).

More precisely, in the MGS algorithm, the orthogonalization of column $\bx_3$ against $\bq_1$ is performed by subtracting the component of $\bq_1$ from the vector $\bx_3$:
$$
\bx_3^{(1) }=  (\bI-\bq_1\bq_1^\top)\bx_3 = \bx_3 - (\bq_1\bq_1^\top)\bx_3,
$$
where $\bx_3^{(1) }$ represents the component of $\bx_3$ that lies in a space perpendicular to $\bq_1$. And the further step is performed by 
\begin{equation}\label{equation:mgs-3d-exmp}
\begin{aligned}
\bx_3^{(2) }=  (\bI-\bq_2\bq_2^\top)\bx_3^{(1) }&=\bx_3^{(1) }-(\bq_2\bq_2^\top)\bx_3^{(1) }\\
&=\bx_3 - (\bq_1\bq_1^\top)\bx_3-\boxed{(\bq_2\bq_2^\top)\textcolor{mylightbluetext}{\bx_3^{(1) }}},
\end{aligned}
\end{equation}
where $\bx_3^{(2) }$ represents the component of $\bx_3^{(1) }$ that lies in a space perpendicular to $\bq_2$. And we highlight the difference from the CGS algorithm in Equation~\eqref{equation:cgs-3d-exmp} using \textcolor{mylightbluetext}{\boxed{blue}} text. 
As a result, $\bx_3^{(2) }$ corresponds to the component of $\bx_3$ that lies in the space perpendicular to $\{\bq_1, \bq_2\}$, as shown in Figure~\ref{fig:projection-mgs-demons-mgs}. 

\index{Cancellation}
\index{Catastrophic cancellation}
\subsection*{{\textbf{Main Difference and Catastrophic Cancellation}}}
The key difference between the CGS and MGS processes lies in the fact that $\bx_3$ can generally have large components in $\spn\{\bq_1, \bq_2\}$. 
In such cases, one starts with large
values and ends up with small values that yields significant relative errors in them. 
This phenomenon is commonly referred to as \textit{catastrophic cancellation}. 
Whereas, $\bx_3^{(1) }$ lies in the direction perpendicular to $\bq_1$ and carries only a small ``error" component in the direction of $\bq_1$. Comparing the \fbox{boxed} terms in Equations~\eqref{equation:cgs-3d-exmp} and \eqref{equation:mgs-3d-exmp}, it is not hard to see that $(\bq_2\bq_2^\top)\bx_3^{(1) }$ in Equation~\eqref{equation:mgs-3d-exmp} is more accurate based on the above argument. And thus, because of the much smaller error in this projection factor, the MGS introduces smaller orthogonalization error at each subtraction step compared to the CGS method. In fact, it can be shown that the final $\bQ$ obtained in the CGS satisfies
$$
\normtwo{\bI-\bQ\bQ^\top} \leq \mathcalO(\epsilon \kappa^2(\bX)),
$$
where $\kappa(\bX)$ is a value larger than 1 determined by $\bX$.
Whereas, in MGS, the error satisfies
$$
\normtwo{\bI-\bQ\bQ^\top} \leq \mathcalO(\epsilon \kappa(\bX)).
$$
That is, the $\bQ$ obtained via MGS is ``more orthogonal."
Therefore, we summarize the difference between the CGS and MGS processes for obtaining $\bq_k$ from the $k$-th column $\bx_k$ of $\bX$, given the orthonormalized vectors $\{\bq_1, \bq_2, \ldots, \bq_{k-1}\}$:
$$
\begin{aligned}
\text{(CGS)}: &\, \text{obtain $\bq_k$ by normalizing $\bx_k^\perp=(\bI-\bQ_{k-1}\bQ_{k-1}^\top)\bx_k$;} \\
\text{(MGS)}: &\, \text{obtain $\bq_k$ by normalizing  $\bx_k^\perp=\big\{(\bI-\bq_{k-1}\bq_{k-1}^\top)\ldots\big[(\bI-\bq_2\bq_2^\top)\big((\bI-\bq_1\bq_1^\top) \bx_k\big)\big]\big\}$.} 
\end{aligned}
$$

\paragrapharrow{Comparison with the Householder algorithm.}
Although both methods have their advantages, in practice,  MGS  usually outperforms  CGS; see examples in \citet{lu2021numerical}.  
However,  MGS can still fall victim to the \textit{catastrophic cancellation} problem. Suppose, in iteration $k$ of the MGS Algorithm~\ref{alg:qr-mgs-right}, $\bx_k$ is almost in the span of $\{\bq_1, \bq_2, \ldots, \bq_{k-1}\}$. This will result in that $\bx_k^\perp$ has only a small component that is perpendicular to $\spn\{\bq_1, \bq_2, \ldots, \bq_{k-1}\}$, whereas the ``error" component in the $\spn\{\bq_1, \bq_2, \ldots, \bq_{k-1}\}$ will be amplified, resulting in $\bQ$ being less orthonormal. 
As mentioned earlier, both the CGS and MGS methods suffer from the same main disadvantage: they obtain the orthogonal matrix $\bQ$ through the upper triangular matrix $\bR$. Specifically, for $\bX\in \real^{n\times p}$, $\bQ$ can be obtained by using the following equation:
$$
\bQ=\bX\underbrace{\bR_1^{-1}\bR_2^{-1}\ldots \bR_p^{-1}}_{\bR^{-1}}.
$$
In this case, the Householder algorithm---that will be introduced in the sequel---finds a successive set of orthogonal matrices $\{\bQ_1, \bQ_2, \ldots, \bQ_l\}$ such that $\bQ_l\ldots\bQ_2\bQ_1\bX$ is triangularized, then $\bQ=(\bQ_l\ldots\bQ_2\bQ_1)^\top$ will be ``more" orthogonal than that in the CGS or MGS method since the condition numbers for the orthogonal matrices are all 1.

\subsection*{{\textbf{Row-Wise MGS, Recursive Algorithm and Other Issues}}}
The algorithms presented  in Algorithm~\ref{alg:qr-mgs-left} and \ref{alg:qr-mgs-right} are used to calculate the entries of the upper triangular matrix $\bR$ in an element-wise and column-by-column manner. 
Suppose $\bX$ has column partition $\bX=[\bx_1, \bX_2]$, where $\bX_2=[\bx_2, \bx_3, \ldots, \bx_p]\in \real^{n\times (p-1)}$. Notice in the CGS Algorithm~\ref{alg:qr-mgs-left}, the first row of $\bR$ can be obtained by 
$$
\left.
\begin{aligned}
r_{11} &= \normtwo{\bx_1}\\
r_{1k} & = \bq_1^\top\bx_k, \,\,\, \forall\, k\in\{2,3,\ldots, p\}
\end{aligned}\right\}
\leadto
\left\{
\begin{aligned}
r_{11} &= \normtwo{\bx_1}\\
\br_{12}^\top & = \bq_1^\top\bX_2, \,\,\, \br_{12}=[r_{12}, r_{13}, \ldots, r_{1p}].
\end{aligned}\right.
$$
Therefore, the QR decomposition of $\bX$ is given by 
$$
\bX = 
\begin{bmatrix}
\bx_1 & \bX_2
\end{bmatrix}=
\begin{bmatrix}
\bq_1 & \bQ_2
\end{bmatrix}
\begin{bmatrix}
r_{11} & \br_{12}^\top\\
\bzero & \bR_{22}
\end{bmatrix}
=
\begin{bmatrix}
r_{11}\bq_1 & \bq_1\br_{12}^\top +\bQ_2\bR_{22}
\end{bmatrix},
$$
where the matrix $\bQ_2\in \real^{n\times (p-1)}$ consists of mutually orthonormal columns   and $\bR_{22}\in \real^{(p-1)\times (p-1)}$ is upper triangular. 
Consequently, $\bQ_2\bR_{22} $ represents the reduced QR decomposition of $\bX_2-\bq_1\br_{12}^\top$, which reveals a recursive algorithm for computing the reduced QR decomposition of $\bX$. 
This approach is equivalent to the MGS method that subtracts each component in the span of $\{\bq_1, \bq_2, \ldots, \bq_{k-1}\}$ when computing column $k$ of $\bQ$ (i.e., equality ($*$) in Algorithm~\ref{alg:qr-mgs-right}). The process is described in Algorithm~\ref{alg:qr-mgs-fulll-rowwise-recursive}.
\begin{algorithm}[H] 
\caption{MGS (\textcolor{mylightbluetext}{Row-Wise and Recursively})=Algorithm~\ref{alg:qr-mgs-right}}
\label{alg:qr-mgs-fulll-rowwise-recursive}
\begin{algorithmic}[1] 
\Require $\bX\in \real^{n\times p}$ with full column rank;
\For{$k=1$ to $p$}  \Comment{i.e., compute $k$-th column of $\bQ$ and $k$-th row of $\bR$}
\State $\bx_1\leftarrow\bX[:,1]$; \Comment{$1$-st column of $\bX\in \real^{n\times (p-k+1)}$}
\State $r_{kk}\leftarrow\normtwo{\bx_1}$;\Comment{$\bx_1\in \real^{n\times 1}$}
\State $\bq_k \leftarrow \bx_1/r_{kk}$;
\State $\bX_2\leftarrow\bX[:,2:p]\in \real^{n\times (p-k)}$; \Comment{2-nd to $p$-th column of $\bX$}
\State $\br_{k2}^\top\leftarrow\bq_k^\top\bX_2$; \Comment{$\br_{k2}^\top\in \real^{1\times (p-k)}$}
\State $\bX\leftarrow\bX_2-\bq_k\br_{k2}^\top$; \Comment{$\bX \in \real^{n\times (p-k)}$}
\EndFor 
\State Output $\bQ=[\bq_1, \ldots, \bq_p]$ and $\bR$ with entry $(i,k)$ being $r_{ik}$.
\end{algorithmic} 
\end{algorithm}

More compactly, Algorithm~\ref{alg:qr-mgs-fulll-rowwise-recursive} can be equivalently stated as Algorithm~\ref{alg:mgs_rowwise-recursive_comp}.
\begin{algorithm}[H] 
\caption{MGS (\textcolor{mylightbluetext}{Row-Wise and Recursively})=Algorithm~\ref{alg:qr-mgs-right}=Algorithm~\ref{alg:qr-mgs-fulll-rowwise-recursive}}
\label{alg:mgs_rowwise-recursive_comp}
\begin{algorithmic}[1] 
\Require $\bX\in \real^{n\times p}$ with full column rank;
\For{$k=1$ to $p$}  \Comment{i.e., compute $k$-th column of $\bQ$ and $k$-th row of $\bR$}
\State $\bq_k\leftarrow\bX[:,k]/\normtwo{\bX[:,k]}$; \Comment{Normalize $k$-th column of $\bX\in \real^{n\times p}$}
\State $\br_k^\top \leftarrow \bq_k^\top \bX$; \Comment{$\br_i^\top\in\real^{1\times p}$, $k$-th row of $\bR$}
\State $\bX \leftarrow\bX-\bq\br^\top$; \Comment{MGS step}
\EndFor 
\State Output $\bQ=[\bq_1, \ldots, \bq_p]$ and $\bR$ with entry $k$-th row being $\br_k^\top$.
\end{algorithmic} 
\end{algorithm}

To enhance the orthogonality of the ${\bq_i}$'s and improve numerical accuracy, an additional re-orthonormalization step can be performed. This step becomes necessary as the basis vectors generated tend to lose their orthonormality during the process. 
The re-orthonormalization steps are highlighted in blue in Algorithm~\ref{alg:mgs_recursive_comp_moreortho}.
\begin{algorithm}[H] 
\caption{MGS (\textcolor{mylightbluetext}{Row-Wise, Re-Orthonormalization} based on Algorithm~\ref{alg:mgs_rowwise-recursive_comp})}
\label{alg:mgs_recursive_comp_moreortho}
\begin{algorithmic}[1] 
\Require $\bX\in \real^{n\times p}$ with full column rank;
\For{$k=1$ to $p$}  \Comment{i.e., compute $k$-th column of $\bQ$ and $k$-th row of $\bR$}
\State $\bq_k\leftarrow\bX[:,k]/\normtwo{\bX[:,k]}$; \Comment{Normalize $k$-th column of $\bX\in \real^{n\times p}$}
\State \textcolor{mylightbluetext}{$\bq\leftarrow\bq - \bQ_{k-1}\bQ_{k-1}^\top \bq$}; \Comment{$\bQ_{k-1} =[\bq_1,\bq_2, \ldots, \bq_{k-1}]\in\real^{n\times (k-1)} $}
\State \textcolor{mylightbluetext}{$\bq \leftarrow \bq/\normtwo{\bq}$};
\State $\br_k^\top \leftarrow \bq_k^\top \bX$; \Comment{$\br_i^\top\in\real^{1\times p}$, $k$-th row of $\bR$}
\State $\bX \leftarrow\bX-\bq\br^\top$; \Comment{MGS step}
\EndFor 
\State Output $\bQ=[\bq_1, \ldots, \bq_p]$ and $\bR$ with entry $k$-th row being $\br_k^\top$.
\end{algorithmic} 
\end{algorithm}

\paragrapharrow{Full QR decomposition.}
A full QR decomposition of an $n\times p$ matrix with linearly independent columns extends the process by appending additional $n-p$ orthonormal columns to $\bQ$, thereby transforming it into an $n\times n$ orthogonal matrix. Simultaneously, $\bR$ is augmented with rows of zeros to attain an $n\times p$ upper triangular matrix.
We refer to the additional columns in $\bQ$ as \textbf{silent columns} and the additional rows in $\bR$ as \textbf{silent rows}. The comparison between the reduced   and the full QR decompositions is shown in Figure~\ref{fig:qr-comparison}.

\paragrapharrow{Dependent columns.}
In our earlier discussions, we assumed that the matrix $\bX$ has linearly independent columns. While this condition simplifies our analysis, it is not strictly required for all scenarios. 
Consider step $k$ Algorithm~\ref{alg:reduced-qr}, where $\bx_k$ lies in the plane spanned by $\bq_1, \bq_2, \ldots, \bq_{k-1}$ (which is equivalent to the space spanned by $\bx_1, \bx_2, \ldots, \bx_{k-1}$), indicating that the vectors $\bx_1, \bx_2, \ldots, \bx_k$ are dependent. 
Then $r_{kk}$ will be zero and $\bq_k$ cannot be determined  due to division by zero.
In such cases, we can arbitrarily choose $\bq_k$ to be any normalized vector that is orthogonal to $\cspace([\bq_1, \bq_2, \ldots, \bq_{k-1}])$ and proceed with the Gram-Schmidt process. 
We reformulate the step $k$ in the algorithm as follows:
$$
\bq_k=\left\{
\begin{aligned}
&\big(\bx_k-\sum_{i=1}^{k-1}r_{ik}\bq_i\big)/r_{kk}, \qquad r_{ik}=\bq_i^\top\bx_k, r_{kk}=\normtwobig{\bx_k-\sum_{i=1}^{k-1}r_{ik}\bq_i}, &\mathrm{if\,} r_{kk}\neq0, \\
&\text{pick  one in\,}\cspace^{\bot}([\bq_1, \bq_2, \ldots, \bq_{k-1}]),\text{ and normalize},\qquad &\mathrm{if\,} r_{kk}=0.
\end{aligned}
\right.
$$

This idea can be further extended such that,  when $\bq_k$ does not exist, we simply skip the current step and add the silent columns at the end of the process.  Consequently, the QR decomposition of a matrix with dependent columns is not unique. 

Moreover, this insight also aids in determining the linear independence of a set of vectors. 
Whenever $r_{kk}$ in Algorithm~\ref{alg:reduced-qr} becomes zero, we report the vectors $\bx_1, \bx_2, \ldots, \bx_k$ are dependent and terminate the algorithm for ``\textit{independence checking}."

\subsection{LS by Householder QR}
\textit{Householder matrices}, also known as \textit{Householder reflectors}, which can reflect vectors, play a crucial role in numerical linear algebra for tasks such as solving linear systems, addressing least squares problems, and deriving Hessenberg forms \citep{golub2013matrix, lu2021numerical}. In this section, we present how Householder reflectors can be used to prove the existence of and compute the QR decomposition.

\subsection*{\textbf{{Computing QR via Householder}}}
\paragrapharrow{{Householder reflectors}.}
Let's begin with the formal definition of a Householder reflector, exploring its properties thereafter.
\begin{definition}[Householder reflector\index{Householder reflector}]\label{definition:householder-reflector}
Let $\bu \in \real^n$ be a unit vector ($\normtwo{\bu}=1$). 
The matrix $\bH = \bI - 2\bu\bu^\top$ is referred to as a \textit{Householder reflector}, a.k.a., a \textit{Householder transformation}. We call this $\bH$ the Householder reflector associated with the unit vector $\bu$, where the unit vector $\bu$ is also known as the \textit{Householder vector}. 
When a vector $\bx$ is multiplied by $\bH$, it undergoes reflection with respect to the hyperplane $\spn\{\bu\}^\perp$.

Note that if $\normtwo{\bu} \neq 1$, we can define the Householder reflector $\bH$ as $\bH = \bI - 2  \frac{\bu\bu^\top}{\bu^\top\bu}$.
\end{definition}
From the definition of the Householder reflector, we can derive the following corollary, which states that certain vectors remain unchanged under the action of the Householder reflector.
\begin{corollary}[Unreflected by Householder]
Let $\bu\in\real^n$ be given with $\normtwo{\bu}=1$, and  define the Householder reflector as $\bH=\bI-2\bu\bu^\top$. 
The Householder reflector leaves  any vector $\bv$  perpendicular to $\bu$ unchanged; that is, $\bH\bv=\bv$ if $\bu^\top\bv=0$. 
\end{corollary}
The proof is straightforward since $(\bI - 2\bu\bu^\top)\bv = \bv - 2\bu\bu^\top\bv=\bv$.

Suppose $\bu$ is a unit vector with $\normtwo{\bu}=1$, and let $\bv$ be a vector perpendicular to $\bu$. Then any vector $\bx$ in the plane can be decomposed into two components:
$$
\bx = \bx_{\bu} + \bx_{\bv},
$$ 
where the first component $\bx_{\bu}$ is parallel to $\bu$ and the second one $\bx_{\bv}$ is perpendicular to $\bu$ (i.e., parallel to $\bv$). 
Referring to Section~\ref{section:project-onto-a-vector} on vector projections, $\bx_{\bu}$ can be computed as $\bx_{\bu} = \frac{\bu\bu^\top}{\bu^\top\bu} \bx = \bu\bu^\top\bx$, representing  the projection of $\bx$ onto  $\bu$. 
Applying the Householder reflector associated with $\bu$ to the vector $\bx$, we obtain:
$$\bH\bx = (\bI - 2\bu\bu^\top)(\bx_{\bv} + \bx_{\bu}) = \bx_{\bv} -\bu\bu^\top \bx = \bx_{\bv} - \bx_{\bu},
$$ 
which means the Householder reflector transforms $\bx_{\bv} + \bx_{\bu}$ into $\bx_{\bv} - \bx_{\bu}$.
In other words, the space perpendicular to $\bu$ acts as a mirror, and any vector $\bx$ is reflected by the Householder reflector associated with $\bu$ (i.e., reflected by the hyperplane $\spn\{\bu\}^\perp$). 
The situation is illustrated in Figure~\ref{fig:householder}.

\begin{SCfigure}
\centering
\includegraphics[width=0.5\textwidth]{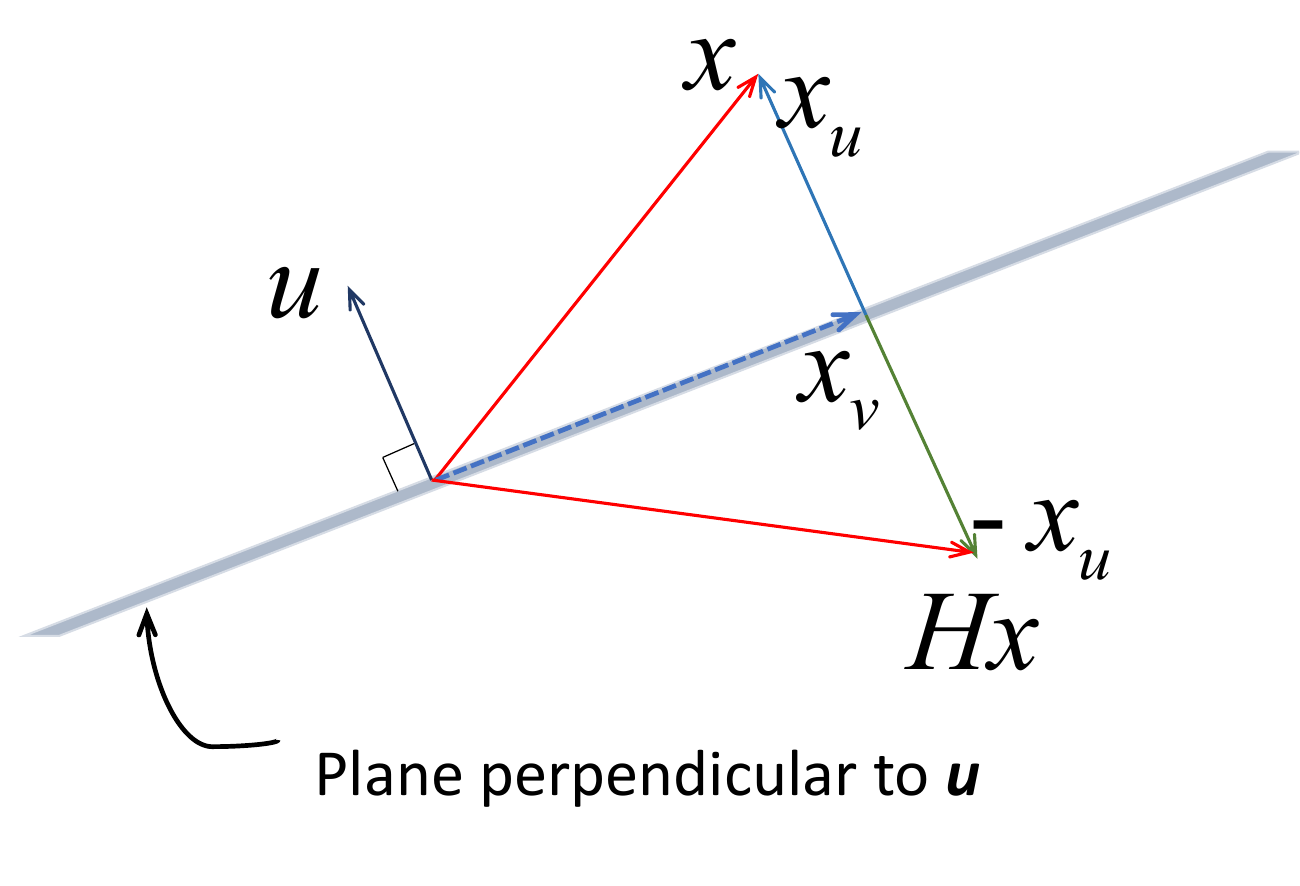}
\caption{Demonstration of the Householder reflector. The Householder reflector obtained by $\bH=\bI-2\bu\bu^\top$, where $\normtwo{\bu}=1$, will reflect a vector $\bx$ along the plane perpendicular to $\bu$. Specifically, it transforms $\bx=\bx_{\bv} + \bx_{\bu}$ into $\bx_{\bv} - \bx_{\bu}$.}
\label{fig:householder}
\end{SCfigure}

The previous discussion explains how to determine the reflected vector using the Householder reflector. 
However, an additional question arises: If we know in advance that two vectors are reflections of each other, how can we find the corresponding Householder reflector? 
The property is crucial for computing the QR decomposition, where we aim to reflect/transform a column into a specific form.
\begin{corollary}[Reflection theorem: finding the Householder reflector]\label{corollary:householder-reflect-finding}
Suppose $\bx$ is reflected to $\by$ by a Householder reflector, with $\normtwo{\bx} = \normtwo{\by}$. The (unique) Householder reflector can be obtained by
$$
\bH = \bI - 2 \bu\bu^\top, \text{ where } \bu = \frac{\bx-\by}{\normtwo{\bx-\by}}.
$$
\end{corollary}
\begin{proof}[of Corollary~\ref{corollary:householder-reflect-finding}]
Write out the equation, we have 
$$
\begin{aligned}
\bH\bx &= \bx - 2 \bu\bu^\top\bx =\bx - 2\frac{(\bx-\by)(\bx^\top-\by^\top)}{(\bx-\by)^\top(\bx-\by)} \bx
= \bx - (\bx-\by) = \by.
\end{aligned}
$$
Note that the condition $\normtwo{\bx} = \normtwo{\by}$ is required to prove the result.
\end{proof}

Householder reflectors are useful for setting a block of components of a given vector to zero. Particularly, it is often desirable to set the vector $\ba\in \real^n$ to  zero, except for the $i$-th element. 
In such cases, the Householder vector can be chosen as:
$$
\bu = \frac{\ba - r\be_i}{\normtwo{\ba - r\be_i}}, \qquad \text{where } r = \pm\normtwo{\ba},
$$
which is a reasonable Householder vector since $\normtwo{\ba} = \normtwo{r\be_i} = \abs{r}$. We carefully notice that when $r=\normtwo{\ba}$, $\ba$ is reflected to $\normtwo{\ba}\be_i$ via the Householder reflector $\bH = \bI - 2 \bu\bu^\top$; conversely, when $r=-\normtwo{\ba}$, $\ba$ is reflected to $-\normtwo{\ba}\be_i$.

Recalling from Section~\ref{section:qr-gram-compute}, we claimed the Householder method (or the Givens method)  utilizes a set of orthogonal matrices to triangularize the matrix, thereby obtaining the QR decomposition and achieving a higher level of orthogonality in this context. The Householder reflector serves as one such orthogonal matrix for this purpose. 
In the following remark, we present additional properties of the Householder reflector.
\begin{remark}[Householder properties]\label{remark:householder-propes}
A Householder reflector $\bH$ possesses the following properties:
\begin{itemize}
\item $\bH\bH = \bI$, i.e., reflecting a vector twice is equivalent to not reflecting it at all.

\item Symmetry: $\bH = \bH^\top$.

\item Orthogonality: $\bH^\top\bH = \bH\bH^\top = \bI$ such that the Householder reflector is an orthogonal matrix.

\item $\bH\bu = -\bu$, if $\bH = \bI - 2 \bu\bu^\top$.
\item Unit eigenvalues: the eigenvalue of $\bH$ is either $1$ or $-1$. Given an eigenpair $(\lambda, \bx)$ of $\bH$, it follows that $\normtwo{\bH\bx}=\normtwo{\lambda\bx}=\normtwo{\bx}$. Therefore, $\lambda=\pm 1$.
\item The determinant of a Householder reflector is $-1$.
\end{itemize}
\end{remark}

\paragrapharrow*{{Householder QR decomposition}.}
To reiterate, as discussed in the Gram-Schmidt section, QR decomposition involves using a triangular matrix to orthogonalize a matrix $\bX$. 
Building upon this concept, if we have a sequence of orthogonal matrices that can successively transform $\bX$ into an upper triangular form, we can also construct the QR decomposition.
In particular, suppose we have an orthogonal matrix $\bQ_1$ that introduces zeros into all entries of the first column of $\bX$ except for the element at position (1,1); 
and similarly, an orthogonal matrix $\bQ_2$ that introduces zeros into the second column except for the elements at positions (2,1) and (2,2); 
$\ldots$. Then we can also find the QR decomposition.
To achieve this zero introduction, we could reflect the columns of the matrix to a basis vector $\be_1$ whose entries are all zero except the first entry.

Let $\bX=[\bx_1, \bx_2, \ldots, \bx_p]\in \real^{n\times p}$ be the column partition of $\bX$, and define further
\begin{equation}\label{equation:qr-householder-to-chooose-r-numeraically}
r_1 \triangleq \normtwo{\bx_1},
\qquad \bu_1  \triangleq {\bx_1 - r_1 \be_1}, 
\qquad \bH_1  \triangleq \bI - \tau_1\bu_1\bu_1^\top, 
\qquad \tau_1 \triangleq \frac{1}{\normtwo{\bx_1 - r_1 \be_1}^2},
\end{equation}
where $\be_1$ here is the first unit basis in $\real^n$, i.e., $\be_1 = [1;0;0;\ldots;0]\in \real^n$.
Then,
\begin{equation}\label{equation:householder-qr-projection-step1}
\bH_1\bX = [\bH_1\bx_1, \bH_1\bx_2, \ldots, \bH_1\bx_p] 
\triangleq
\begin{bmatrix}
r_1 & \bR_{1,2:p} \\
\bzero&  \bB_2
\end{bmatrix},
\end{equation}
which reflects $\bx_1$ to $r_1\be_1$ and introduces zeros below the diagonal in the first column. We observe that the entries below $r_1$ become zero after this specific reflection. Notice that we reflect $\bx_1$ to $\normtwo{\bx_1}\be_1$, where both vectors have the same length, rather than reflect $\bx_1$ to $\be_1$ directly  to match the requirement stated in Corollary~\ref{corollary:householder-reflect-finding}. 

It should be noted that the choice of $r_1$ is \textbf{not unique}. To ensure \textbf{numerical stability}, it is desirable to set $r_1 =-\text{sign}(a_{11}) \normtwo{\bx_1}$, where $a_{11}$ represents the first component of $\bx_{1}$. 
Alternatively, one can also choose $r_1 =\text{sign}(a_{11}) \normtwo{\bx_1}$, as long as $\normtwo{\bx_1}$ is equal to $\normtwo{r_1\be_1}$. 
However, if we require the diagonal entries of the upper triangular matrix $\bR$ to be positive---so that the QR decomposition is unique---it becomes necessary to select a positive value for $r_1$.

Next, we can apply this process to $\bB_2$ in \eqref{equation:householder-qr-projection-step1} to transform the entries  below the entry (2,2) into zeros. 
Note that we do not apply this process to the entire matrix $\bH_1\bX$ but only to the submatrix $\bB_2$ in it because we have already introduced zeros in the first column, and reflecting again will reintroduce nonzero values back and destroy what have accomplished.

Suppose $\bB_2 = [\bb_2, \bb_3, \ldots, \bb_p]$ is the column partition of $\bB_2$, and define 
\begin{equation}\label{equation:qr-householder-to-chooose-r-numeraically2}
r_2 \triangleq \normtwo{\bb_2},
\quad \widetildebu_2 \triangleq {\bb_2 - r_2 \be_1},   
\quad \widetilde{\bH}_2 \triangleq \bI - \tau_2\widetildebu_2\widetildebu_2^\top, 
\quad \tau_2 = \frac{1}{\footnotesize\normtwo{\bb_2 - r_2 \be_1}^2},
\quad 
\bH_2 \triangleq
\begin{bmatrixfoot}
	1 & \bzero \\
	\bzero & \widetilde{\bH}_2
\end{bmatrixfoot}.
\end{equation}
In this context, $\be_1$ denotes the first unit basis in $\real^{n-1}$, and $\bH_2$ is also an orthogonal matrix since $\widetilde{\bH}_2$ is orthogonal.
Note that $\bH_2$ is also an Householder reflector since it can can be equivalently denoted as $\bH_2=\bI-2\bu_2\bu_2^\top$ with $\bu_2 = \begin{bmatrixfoot}
\bzero \\
\widetildebu_2 
\end{bmatrixfoot}$.
Applying $\widetilde{\bH_2}$ or $\bH_2$ yields
$$
\widetilde{\bH_2}\bB_2 = [\bH_2\bb_2, \bH_2\bb_3, \ldots, \bH_2\bb_p]
\triangleq
\begin{bmatrix}
r_2 & \bR_{2,3:p} \\
\bzero &\bC_3
\end{bmatrix},
$$
and
$$
\bH_2\bH_1\bX = [\bH_2\bH_1\bx_1, \bH_2\bH_1\bx_2, \ldots, \bH_2\bH_1\bx_p]
\triangleq
\begin{bmatrixfoot}
r_1 & r_{12} & \bR_{1,3:p} \\
0 & r_2 & \bR_{2,3:p} \\
\bzero &  \bzero &\bC_3
\end{bmatrixfoot}.
$$

The same process can go on. And if $\bX\in \real^{n\times p}$, after $p$ stages, we will finally triangularize $\bX = (\bH_p \bH_{p-1}\ldots\bH_1)^{-1} \bR = \bQ\bR$. Since the $\bH_i$'s are symmetric and orthogonal (Remark~\ref{remark:householder-propes}), we have orthogonal $\bQ=(\bH_p \bH_{p-1}\ldots\bH_1)^{-1} = \bH_1 \bH_2\ldots\bH_p$.

An example of a $5\times 4$ matrix is shown as follows, where $\boxtimes$ represents a value that is not necessarily zero, and \textbf{boldface} indicates the value has just been changed:
$$
\footnotesize
\begin{aligned}
\begin{sbmatrix}{\bX}
\boxtimes & \boxtimes & \boxtimes & \boxtimes \\
\boxtimes & \boxtimes & \boxtimes & \boxtimes \\
\boxtimes & \boxtimes & \boxtimes & \boxtimes \\
\boxtimes & \boxtimes & \boxtimes & \boxtimes \\
\boxtimes & \boxtimes & \boxtimes & \boxtimes
\end{sbmatrix}
&\stackrel{\bH_1}{\rightarrow}
\begin{sbmatrix}{\bH_1\bX}
\bm{\boxtimes} & \bm{\boxtimes} & \bm{\boxtimes} & \bm{\boxtimes} \\
\bm{0} & \bm{\boxtimes} & \bm{\boxtimes} & \bm{\boxtimes} \\
\bm{0} & \bm{\boxtimes} & \bm{\boxtimes} & \bm{\boxtimes} \\
\bm{0} & \bm{\boxtimes} & \bm{\boxtimes} & \bm{\boxtimes} \\
\bm{0} & \bm{\boxtimes} & \bm{\boxtimes} & \bm{\boxtimes}
\end{sbmatrix}
\stackrel{\bH_2}{\rightarrow}
\begin{sbmatrix}{\bH_2\bH_1\bX}
\boxtimes & \boxtimes & \boxtimes & \boxtimes \\
0 & \bm{\boxtimes} & \bm{\boxtimes} & \bm{\boxtimes} \\
0 & \bm{0} & \bm{\boxtimes} & \bm{\boxtimes} \\
0 & \bm{0} & \bm{\boxtimes} & \bm{\boxtimes} \\
0 & \bm{0} & \bm{\boxtimes} & \bm{\boxtimes}
\end{sbmatrix}
\stackrel{\bH_3}{\rightarrow}
\begin{sbmatrix}{\bH_3\bH_2\bH_1\bX}
\boxtimes & \boxtimes & \boxtimes & \boxtimes \\
0 & \boxtimes & \boxtimes & \boxtimes \\
0 & 0 & \bm{\boxtimes} & \bm{\boxtimes} \\
0 & 0 & \bm{0} & \bm{\boxtimes} \\
0 & 0 & \bm{0} & \bm{\boxtimes}
\end{sbmatrix}
\stackrel{\bH_4}{\rightarrow}
\begin{sbmatrix}{\bH_4\bH_3\bH_2\bH_1\bX}
\boxtimes & \boxtimes & \boxtimes & \boxtimes \\
0 & \boxtimes & \boxtimes & \boxtimes \\
0 & 0 & \boxtimes & \boxtimes \\
0 & 0 & 0 & \bm{\boxtimes} \\
0 & 0 & 0 & \bm{0}
\end{sbmatrix}.
\end{aligned}
$$

The Householder algorithm is a procedure  that transforms a matrix into  triangular form through a sequence of orthogonal matrix operations. In the Gram-Schmidt process (both CGS and MGS), we use a triangular matrix to orthogonalize the matrix. 
In contrast, the Householder algorithm employs orthogonal matrices for matrix triangularization. 
The difference between these two approaches can be summarized as follows:
\begin{itemize}
\item Gram-Schmidt algorithm: triangular orthogonalization; 
\item Householder algorithm: orthogonal triangularization.
\end{itemize}

We further notice that, in the Householder algorithm, a set of orthogonal matrices are applied so that the QR decomposition obtained is a \textit{full} QR decomposition. Whereas, the direct QR decomposition obtained by CGS or MGS is a \textit{reduced} decomposition (although the silent columns or rows can be further added to find the full version).

\index{QR decomposition}
\begin{theoremHigh}[Algorithm complexity: QR via Householder \citep{lu2021numerical}]\label{theorem:qr-full-householder}
Householder QR algorithm requires $\sim 2np^2-\frac{2}{3}p^3$ flops to compute a full QR decomposition of an $n\times p$ matrix with linearly independent columns and $n\geq p$. Further, if $\bQ$ is needed explicitly
~\footnote{In many problems, $\bQ$ may not be needed, which is referred to as a \textit{Q-less QR decomposition}. In our case, when solving the linear system $\bX\bbeta=\by$, we may construct $\bQ^\top\by=\bR^{-\top}\bX^\top\by$ and compute $\bQ^\top\by$ directly, avoiding the need to form $\bQ$.}, 
an additional $\sim 4n^2p-2np^2$ flops are required.
\end{theoremHigh}

\begin{exercise}
Following the Householder QR algorithm, 
use the Givens rotation introduced in Definition~\ref{definition:givens-rotation-in-qr} to compute the QR decomposition.
\end{exercise}

\subsection*{\textbf{{Least Squares Problems using Householder QR}}}
\paragrapharrow{LS via Householder QR.}
When using the Householder QR algorithm to solve the least squares problem, the factor $ \bQ $ is not explicitly formed but implicitly defined as $ \bQ = \bH_1 \bH_2 \ldots \bH_p $. 
By Theorem~\ref{theorem:qr-for-ls}, $\bQ^\top\by$ and the residual vector $\be=\by-\bX\widehatbbeta$ can be obtained as:
$$
\begin{bmatrix}
\bc_1 \\
\bc_2
\end{bmatrix}
= \bH_p \ldots \bH_2 \bH_1\by,
\qquad
\be = \bH_1 \bH_2 \ldots \bH_p
\begin{bmatrix}
\bzero \\
\bc_2
\end{bmatrix}.
$$

The Householder QR factorization can be applied to the extended matrix $ [\bX , \by] $,
$$
[\bX , \by]
= \widetildebQ
\begin{bmatrix}
\bR_1 & \bc_1 \\
\bzero & \rho \be_1
\end{bmatrix},
\quad
\widetildebQ = \bH_1 \ldots \bH_p \bH_{p+1}.
$$
Then $ \bR_1 \bbeta = \bc_1 $ and the residual and its norm are given by
$$
\be = \bH_1 \ldots \bH_p \bH_{p+1}
\begin{bmatrix}
\bzero \\
\rho \be_1
\end{bmatrix},
\qquad
\normtwo{\be} = \rho \geq 0.
$$

The (Q-less) Householder QR factorization requires $\sim 2np^2 - \frac{2}{3}p^3 $ flops, and computing $ \bQ^\top \by $ and solving $ \bR_1\bbeta = \bc_1 $ require a further $\sim 4np - p^2 $ flops. If one wants not only $ \normtwo{\be} $ but also $ \be $, another $ 4np - 2p^2 $ flops are needed. This can be compared to the method of normal equation using the Cholesky decomposition, which requires $ \sim np^2 + \frac{1}{3}p^3 $ flops for the factorization and $\sim  2(np + p^2) $ flops for each right-hand side. For $ n = p $ this is about the same as for the Householder QR method, but for $ n \gg p $, the Householder method is roughly twice as expensive.

\index{Normal equation}
\index{Minimum-norm solution}
\index{QR decomposition}
\paragrapharrow{Minimum-Norm LS via Householder QR.}
Similarly, we consider the underdetermined  and consistent linear system $ \bX^\top \balpha = \bz $, where $ \bX^\top \in \real^{p\times n} $ has full row rank $ p $. Then the minimum-norm solution $ \balpha \in \real^n $ can also be computed using the Householder QR factorization of $ \bX =\bQ\begin{bmatrixfoot}
\bR_1 \\
\bzero 
\end{bmatrixfoot}$. 
Recall that the minimum-norm solution must satisfy the normal equation of the second kind in \eqref{equation:consis_minimunorm}. 
From the factorization, we have $ \bX^\top = [\bR_1^\top, \bzero] \bQ^\top $, and thus the system becomes
$$
\bX^\top \balpha = [\bR_1^\top,  \bzero] \bd = \bz, \quad \text{with } \bd \triangleq  \bQ^\top \balpha \triangleq \begin{bmatrix} \bd_1 \\ \bd_2 \end{bmatrix}.
$$
Since the $\ell_2$ norm is orthogonally invariant, i.e., $ \normtwo{\balpha} = \normtwo{\bd} $, the problem reduces to $ \min \normtwo{\bd_1} $ subject to the constraint $ \bR_1^\top \bd_1 = \bz $. 
Because $\bd_1 = \bR_1^{-\top}\bz$ is uniquely determined by this constraint,  the minimum-norm solution is obtained by setting $ \bd_2 = \bzero $, and
$$
\bR_1^\top \bd_1 = \bz, \qquad \balpha = \bQ \begin{bmatrix} \bd_1 \\ \bzero \end{bmatrix}.
$$

\paragrapharrow{Augmented LS problem via Householder QR.}
As mentioned in \eqref{equation:sys_aug_sys},  both the least squares and minimum-norm problems are special cases of the following augmented LS problem:
\begin{equation}\label{equation:sys_aug_sys_inqr}
(\text{AuLS}):\qquad 
\begin{bmatrix}
\bI & \bX \\
\bX^\top & \bzero
\end{bmatrix}
\begin{bmatrix}
\balpha \\
\bbeta
\end{bmatrix}
=
\begin{bmatrix}
\by \\
\bz
\end{bmatrix},
\quad \by \in \real^n, \quad \bz \in \real^p,
\end{equation}
where $\bX \in \real^{n\times p}$ and $\rank(\bX) = p$. From the QR factorization of $\bX$ we obtain
$$
\balpha + \bQ
\begin{bmatrix}
\bR_1 \\
\bzero
\end{bmatrix}
\bbeta = \by
\qquad \text{and} \qquad
[\bR_1^\top, \bzero]
\bQ^\top \balpha = \bz.
$$
Multiplying the first equation by $\bQ^\top$ and the second by $\bR_1^{-\top}$, we obtain:
$$
\bQ^\top \balpha + \begin{bmatrix} \bR_1 \\ \bzero \end{bmatrix} \bbeta = \bQ^\top \by
\qquad \text{and} \qquad
[\bI_p , \bzero] \bQ^\top \balpha = \bR_1^{-\top} \bz.
$$
From the second equation, we can determine the first $p$ components of $\bQ^\top \balpha$. 
These can then be substituted into the first equation to solve for $\bbeta$. 
The last $n-p$ components of $\bQ^\top \balpha$ are obtained from the last $n-p$ equations in the first equation. The resulting QR algorithm for solving the augmented system \eqref{equation:sys_aug_sys_inqr} using QR factorization is summarized below.

\begin{theoremHigh}[Augmented LS solution by  QR]
Compute the  QR factorization of $\bX \in \real^{n\times p}$, $\rank(\bX) = p$, and
\begin{align}
\bd_1 &\triangleq  \bR_1^{-\top} \bz, \qquad \bc \triangleq \begin{bmatrix} \bc_1 \\ \bc_2 \end{bmatrix} \triangleq \bQ^\top \by  \\
\implies\quad  \widehatbbeta &= \bR_1^{-1} (\bc_1 - \bd_1), \qquad \widehatbalpha = \bQ \begin{bmatrix} \bd_1 \\ \bc_2 \end{bmatrix}. 
\end{align}
\end{theoremHigh}
When $\bz=\bzero$, then $\widehatbbeta=\bR_1^{-1}\bc_1$ recovers the least squares solution presented in Theorem~\ref{theorem:qr-for-ls}; when $\bbeta=\bzero$ and $\by=\balpha$, then $\bc_1=\bzero$ obtains the minimum-norm solution for the consistent system.

The algorithm involves solving triangular systems with $\bR_1$ and $\bR_1^\top$, as well as multiplying vectors by $\bQ$ and $\bQ^\top$. 
These operations amount to a total of approximately $8np - 2p^2$ flops.

\paragrapharrow{Weighted LS problems.}
We will discuss numerical methods for solving the generalized least squares problems \eqref{equation:gls_prob_loss} in Section~\ref{section:gls_ellipmgs}, of which the WLS problem is a special case.
The WLS problem can be formulated as 
\begin{equation}\label{equation:wls_in_qr}
\min_{\bbeta} \normtwo{\bW(\bX\bbeta - \by)}^2, \quad \bW = \bOmega^{-1/2} = \diag(w_1, w_2, \ldots, w_n),
\end{equation}
where weights $ w_i $ can be understood such that the weighted residuals $ e_i = w_i(\by - \bX\bbeta)_i $ have equal variance; see Remark~\ref{remark:samp_gls}. 
Note that the solution to \eqref{equation:wls_in_qr} is scale-invariant, i.e., it does not change if $ \bW $ is multiplied by a nonzero scalar. Therefore, without loss of generality, we can assume in the following that $ w_i \geq 1 $, and that the rows of $ \bX $ are normalized so that $ \max_{1 \leq j \leq p} \abs{x_{ij}} = 1 $, $ i = 1, 2,\ldots, n $.
The solution to the WLS problem \eqref{equation:wls_in_qr} satisfies the normal equation; see \eqref{equation:gls_prob_gne}:
\begin{equation}
\bX^\top \bW^2 \bX \bbeta = \bX^\top \bW^2 \by.
\end{equation}
A more numerically stable solution method involves using the \textit{weighted QR factorization} $ \bW \bX = \bQ\bR $. The solution to \eqref{equation:wls_in_qr} is then obtained by solving
\begin{equation}
\bR\bbeta = \bQ^\top \bW \by,
\end{equation}
thus avoiding the need to explicitly square the weight matrix $ \bW $.

For a consistent underdetermined system $ \bX^\top \balpha = \bz $, the unique solution of the \textit{weighted minimum-norm problem}
\begin{equation}
\min_{\balpha} \normtwo{\bW \balpha}  \quad \text{s.t. } \bX^\top \balpha = \bz
\end{equation}
is given by the generalized normal equation of the second kind; see \eqref{equation:gls_prob_gne2}:
\begin{equation}
(\bX^\top \bW^2 \bX) \bgamma = \bz, \qquad \balpha = \bW^2 \bX \bgamma.
\end{equation}
Again, using the weighted QR factorization $ \bW \bX = \bQ\bR $ leads to a more accurate solution by avoiding squaring the weight matrix:
\begin{equation}
\balpha = \bW \bQ \bR^{-\top} \bz.
\end{equation}

\subsection{Modifying LS: Appending or Deleting a Covariate/Column}\label{section:append-column-qr}
In Section~\ref{section:cholesky-rank-one-update}, we discussed how to use the Cholesky decomposition to add or delete a data in the least squares problem efficiently. 
In certain applications, such as the $F$-test for least squares via QR decomposition (see Section~\ref{section:variable-selection}), there arises a need to remove or add a column/covariate to the observed matrix. The objective is to efficiently obtain the QR decomposition of the modified matrix.

\paragrapharrow{Deleting a column.}
Suppose the QR decomposition of $\bX\in \real^{n\times p}$ is given by $\bX=\bQ\bR$, where the column partition of $\bX$ is $\bX=[\bx_1,\bx_2,\ldots,\bx_p]$. Now, if we delete the $k$-th column of $\bX$, resulting in $\bX^\prime \triangleq [\bx_1,\ldots,\bx_{k-1},\bx_{k+1},\ldots,\bx_p] \in \real^{n\times (p-1)}$, we want to compute the QR decomposition of $\bX^\prime$ efficiently.
Additionally, let $\bR$ have the following structure:
$$
\begin{aligned}
\begin{blockarray}{ccccc}
\begin{block}{c[ccc]c}
&	\bR_{11} & \bx & \bR_{12} & k-1  \\
\bR \triangleq	&  \bzero    &	r_{kk} & \bb^\top  & 1 \\
&	\bzero  &\bzero& \bR_{22} & n-k  \\
\end{block}
& k-1 & 1 &  p-k & \\
\end{blockarray}, 
\end{aligned}
\qquad\text{such that}\qquad
\bQ^\top \bX^\prime = 
\begin{bmatrix}
\bR_{11} &\bR_{12} \\
\bzero & \bb^\top \\
\bzero & \bR_{22}
\end{bmatrix} \triangleq \bH
$$
is  \textit{upper Hessenberg}. 
An example is provided below to illustrate the case of a $6\times 5$ matrix, where $k=3$ and the $k$-th column is deleted:
$$
\begin{aligned}
\begin{sbmatrix}{\bR = \bQ^\top\bX}
\boxtimes & \boxtimes & \boxtimes & \boxtimes & \boxtimes \\
0 & \boxtimes & \boxtimes & \boxtimes & \boxtimes \\
0 & 0 & \boxtimes & \boxtimes& \boxtimes \\
0 & 0 & 0 & \boxtimes& \boxtimes \\
0 & 0 & 0 & 0& \boxtimes \\
0 & 0 & 0 & 0& 0
\end{sbmatrix}
&\quad\implies\quad
\begin{sbmatrix}{\bH = \bQ^\top\bX^\prime}
\boxtimes & \boxtimes  & \boxtimes & \boxtimes \\
0 & \boxtimes  & \boxtimes & \boxtimes \\
0 & 0 & \boxtimes& \boxtimes \\
0 & 0  & \boxtimes& \boxtimes \\
0 & 0  & 0& \boxtimes \\
0 & 0  & 0& 0
\end{sbmatrix}.
\end{aligned}
$$

Again, for columns $k$ to $p-1$ of $\bH$, there exists a set of Givens rotations (Definition~\ref{definition:givens-rotation-in-qr}) $\bG_{k,k+1}$, $\bG_{k+1,k+2}$, $\ldots$, $\bG_{p-1,p}$ that can be applied to introduce zeros in the subdiagonal elements $h_{k+1,k}$, $h_{k+2,k+1}$, $\ldots$, $h_{p,p-1}$ of $\bH$. Then the triangular matrix $\bR^\prime$ is given by 
$$
\bR^\prime \triangleq \bG_{p-1,p}\ldots \bG_{k+1,k+2}\bG_{k,k+1}\bQ^\top \bX^\prime.
$$
The orthogonal matrix can be obtained through the following procedure:
\begin{equation}\label{equation:qr-delete-column-finalq}
\bQ^\prime = (\bG_{p-1,p}\ldots \bG_{k+1,k+2}\bG_{k,k+1}\bQ^\top )^\top = \bQ \bG_{k,k+1}^\top  \bG_{k+1,k+2}^\top \ldots \bG_{p-1,p}^\top,
\end{equation}
such that $\bX^\prime = \bQ^\prime\bR^\prime$. 
The procedure is outlined in Algorithm~\ref{alg:qr-delete-a-column}. And the $6\times 5$ example is shown as follows, where $\boxtimes$ represents a value that is not necessarily zero, and \textbf{boldface} indicates the value has just been changed:
$$
\begin{aligned}
\begin{sbmatrix}{\bR = \bQ^\top\bX}
\boxtimes & \boxtimes & \boxtimes & \boxtimes & \boxtimes \\
0 & \boxtimes & \boxtimes & \boxtimes & \boxtimes \\
0 & 0 & \boxtimes & \boxtimes& \boxtimes \\
0 & 0 & 0 & \boxtimes& \boxtimes \\
0 & 0 & 0 & 0& \boxtimes \\
0 & 0 & 0 & 0& 0
\end{sbmatrix}
&\stackrel{k=3}{\rightarrow}
\begin{sbmatrix}{\bH = \bQ^\top\bX^\prime}
\boxtimes & \boxtimes  & \boxtimes & \boxtimes \\
0 & \boxtimes  & \boxtimes & \boxtimes \\
0 & 0 & \boxtimes& \boxtimes \\
0 & 0  & \boxtimes& \boxtimes \\
0 & 0  & 0& \boxtimes \\
0 & 0  & 0& 0
\end{sbmatrix}
\stackrel{\bG_{34}}{\rightarrow}
\begin{sbmatrix}{\bG_{34}\bH }
\boxtimes & \boxtimes  & \boxtimes & \boxtimes \\
0 & \boxtimes  & \boxtimes & \boxtimes \\
0 & 0 & \bm{\boxtimes}& \bm{\boxtimes} \\
0 & 0  & \bm{0}& \bm{\boxtimes} \\
0 & 0  & 0& \boxtimes \\
0 & 0  & 0& 0
\end{sbmatrix}
\stackrel{\bG_{45}}{\rightarrow}
\begin{sbmatrix}{\bG_{45}\bG_{34}\bH}
\boxtimes & \boxtimes  & \boxtimes & \boxtimes \\
0 & \boxtimes  & \boxtimes & \boxtimes \\
0 & 0 & \boxtimes & \boxtimes \\
0 & 0  &0 & \bm{\boxtimes} \\
0 & 0  & 0& \bm{0} \\
0 & 0  & 0& 0
\end{sbmatrix}.
\end{aligned}
$$

\begin{algorithm}[h] 
\caption{QR Deleting a Column} 
\label{alg:qr-delete-a-column} 
\begin{algorithmic}[1] 
\Require Matrix $\bX \in \real^{n\times p}$ with full QR decomposition $\bX=\bQ\bR$, and $\bX^\prime \in \real^{n\times (p-1)}$ by deleting column $k$ of $\bX$; 
\Statex \textbf{Stage A: Triangularize $\bH$}
\State Obtain $\bH$ by deleting column $k$ of $\bR$, that is, $\bH=\bQ^\top\bX^\prime$;
\For{$i=k$ to $p-1$} 
\State Get Givens rotation $\bG_{i,i+1}$ with the following parameters $c, s$:
\State $c \leftarrow \frac{x_k}{\sqrt{x_k^2 + x_l^2}}$, $s\leftarrow\frac{x_l}{\sqrt{x_k^2 + x_l^2}}$, where $x_k = h_{ii}$, $x_l = h_{i+1,i}$;
\State Calculate $\bH \leftarrow \bG_{i,i+1}\bH $ in following two steps:
\State $i$-th row: $\bH_{i,:} \leftarrow c\cdot \bH_{i,:} + s \bH_{j,:} $, where $j=i+1$;
\State $(i+1)$-th row: $\bH_{i+1,:} \leftarrow -s\cdot \bH_{i,:} + c \bH_{j,:} $, where $j=i+1$;
\EndFor
\State Set $\bR^\prime \leftarrow \bH$ and output $\bR^\prime$;
\Statex \textbf{Stage B: Obtain the orthogonal matrix $\bQ^\prime$}
\State Set $\bQ^\prime \leftarrow \bQ^\top $;
\For{$i=k$ to $p-1$} 
\State $c \leftarrow \frac{x_k}{\sqrt{x_k^2 + x_l^2}}$, $s\leftarrow\frac{x_l}{\sqrt{x_k^2 + x_l^2}}$, where $x_k$, $x_l$ are from step 4;
\State Calculate $\bQ^\prime \leftarrow \bG_{i,i+1}\bQ^\prime $ in following two steps:
\State $i$-th row: $\bQ^\prime_{i,:} \leftarrow c\cdot \bQ^\prime_{i,:} + s \bQ^\prime_{j,:} $, where $j=i+1$;
\State $(i+1)$-th row: $\bQ^\prime_{i+1,:} \leftarrow -s\cdot \bQ^\prime_{i,:} + c \bQ^\prime_{j,:} $, where $j=i+1$;
\EndFor
\State Output $\bQ^\prime \leftarrow \bQ^{\prime\top}$ from Equation~\eqref{equation:qr-delete-column-finalq}; 
\end{algorithmic} 
\end{algorithm}

\begin{theoremHigh}[Algorithm Complexity: QR Deleting Column \citep{lu2021numerical}]\label{theorem:qr-full-givens-delete-column}
Algorithm~\ref{alg:qr-delete-a-column}  requires $\sim 3p^2-6pk+3k^2$ flops to compute a full QR decomposition of the matrix $\bX^\prime \in \real^{n\times (p-1)}$.
This matrix is obtained by deleting  $k$-th column from $\bX\in \real^{n\times p}$, assuming that the full QR decomposition of $\bX$ is already known. 
Furthermore, if the orthogonal matrix $\bQ^\prime$ needs to be formed explicitly, an additional $\sim 6n(p-k)$ flops are required.
\end{theoremHigh}

Note that the value of column $k$ affects the complexity: when $k=p$, the complexity is $\sim 0$ (ignoring constant terms); and when $k=1$, the complexity reaches its maximum value.

\paragrapharrow{Appending a column.}
Similarly, suppose $\widetilde{\bX} \triangleq [\bx_1,\bx_k,\bw,\bx_{k+1},\ldots,\bx_p]$, where we append a vector $\bw$ to the $(k+1)$-th column of $\bX$. Applying the orthogonal transformation $\bQ^\top$, we have
$$
\bQ^\top \widetilde{\bX} = [\bQ^\top\bx_1,\ldots, \bQ^\top\bx_k, \bQ^\top\bw, \bQ^\top\bx_{k+1}, \ldots,\bQ^\top\bx_p] \triangleq \widetilde{\bH}.
$$
A set of Givens rotations $\bJ_{n-1,n}, \bJ_{n-2,n-1}, \ldots, \bJ_{k+1,k+2}$ can introduce zeros in the elements $\widetilde{h}_{n,k+1}$, $\widetilde{h}_{n-1,k+1}$, $\ldots$, $\widetilde{h}_{k+2,k+1}$ of $\widetilde{\bH}$, thereby achieving the desired result of the updated QR decomposition. That is,
$$
\widetilde{\bR} \triangleq\bJ_{k+1,k+2}\ldots \bJ_{n-2,n-1}\bJ_{n-1,n} \bQ^\top \widetilde{\bX},
$$
is upper triangular.
Suppose $\widetilde{\bH}$ is a $6\times 5$ matrix and $k=2$. An example is shown below, where $\boxtimes$ represents a value that is not necessarily zero, and \textbf{boldface} indicates the value has just been changed:
$$
\footnotesize
\begin{aligned}
\begin{sbmatrix}{\widetilde{\bH}}
\boxtimes & \boxtimes & \boxtimes & \boxtimes & \boxtimes \\
0 & \boxtimes & \boxtimes & \boxtimes & \boxtimes \\
0 & 0 & \boxtimes & \boxtimes& \boxtimes \\
0 & 0 & \boxtimes & 0& \boxtimes \\
0 & 0 & \boxtimes & 0& 0 \\
0 & 0 & \boxtimes & 0& 0
\end{sbmatrix}
&\stackrel{\bJ_{56}}{\rightarrow}
\begin{sbmatrix}{\bJ_{56}\widetilde{\bH} \rightarrow \widetilde{h}_{63}=0}
\boxtimes & \boxtimes & \boxtimes & \boxtimes & \boxtimes \\
0 & \boxtimes & \boxtimes & \boxtimes & \boxtimes \\
0 & 0 & \boxtimes & \boxtimes& \boxtimes \\
0 & 0 & \boxtimes & 0& \boxtimes \\
0 & 0 & \bm{\boxtimes} & 0& 0 \\
0 & 0 & \bm{0} & 0& 0
\end{sbmatrix}
\stackrel{\bJ_{45}}{\rightarrow}
\begin{sbmatrix}{\bJ_{45}\bJ_{56}\widetilde{\bH}\rightarrow \widetilde{h}_{53}=0}
\boxtimes & \boxtimes & \boxtimes & \boxtimes & \boxtimes \\
0 & \boxtimes & \boxtimes & \boxtimes & \boxtimes \\
0 & 0 & \boxtimes & \boxtimes& \boxtimes \\
0 & 0 & \bm{\boxtimes} & 0& \bm{\boxtimes} \\
0 & 0 & \bm{0} & 0& \bm{\boxtimes}\\
0 & 0 & 0 & 0& 0
\end{sbmatrix}
\stackrel{\bJ_{34}}{\rightarrow}
\begin{sbmatrix}{\bJ_{34}\bJ_{45}\bJ_{56}\widetilde{\bH}\rightarrow \widetilde{h}_{43}=0}
\boxtimes & \boxtimes & \boxtimes & \boxtimes & \boxtimes \\
0 & \boxtimes & \boxtimes & \boxtimes & \boxtimes \\
0 & 0 & \bm{\boxtimes}  & \bm{\boxtimes} & \bm{\boxtimes}  \\
0 & 0 & \bm{0}  & \bm{\boxtimes} & \bm{\boxtimes}  \\
0 & 0 & 0 & 0& \boxtimes\\
0 & 0 & 0 & 0& 0
\end{sbmatrix} = \widetilde{\bR}.
\end{aligned}
$$ 
Finally, we obtain the orthogonal matrix 
\begin{equation}\label{equation:qr-add-column-finalq}
\widetilde{\bQ} = (\bJ_{k+1,k+2}\ldots \bJ_{n-2,n-1}\bJ_{n-1,n} \bQ^\top )^\top = \bQ \bJ_{n-1,n}^\top  \bJ_{n-2,n-1}^\top \ldots \bJ_{k+1,k+2}^\top,
\end{equation}
such that $\widetilde{\bX} = \widetilde{\bQ}\widetilde{\bR}$. The procedure is formulated in Algorithm~\ref{alg:qr-adding-a-column}.

\begin{algorithm}[h] 
\caption{QR Adding a Column} 
\label{alg:qr-adding-a-column} 
\begin{algorithmic}[1] 
\Require Matrix $\bX \in \real^{n\times p}$ with full QR decomposition $\bX=\bQ\bR$, and $\widetilde{\bX}\in \real^{n\times (p+1)}$ by adding column $\bw$ into $(k+1)$-th column of $\bX$; 
\Statex \textbf{Stage A: Triangularize $\widetilde{\bH}$}
\State Calculate $\bQ^\top\bw$;
\State Obtain $\widetilde{\bH}$ by inserting $\bQ^\top \bw$ into $(k+1)$-th column of $\bR$;
\For{$i=n-1$ to $k+1$} 
\State Get Givens rotation $\bJ_{i,i+1}$ with the following parameters $c, s$:
\State $c \leftarrow \frac{x_k}{\sqrt{x_k^2 + x_l^2}}$, $s\leftarrow\frac{x_l}{\sqrt{x_k^2 + x_l^2}}$, where $x_k = \widetilde{h}_{i,k+1}$, $x_l = \widetilde{h}_{i+1,k+1}$;
\State Calculate $\widetilde{\bH} \leftarrow \bJ_{i,i+1}\bH $ in following two steps:
\State $i$-th row: $\widetilde{\bH}_{i,:} \leftarrow c\cdot \widetilde{\bH}_{i,:} + s \widetilde{\bH}_{j,:} $, where $j=i+1$;
\State $(i+1)$-th row: $\widetilde{\bH}_{i+1,:} \leftarrow -s\cdot \widetilde{\bH}_{i,:} + c \widetilde{\bH}_{j,:} $, where $j=i+1$;
\EndFor
\State Set $\widetilde{\bR} \leftarrow \widetilde{\bH}$ and output $\widetilde{\bR}$;
\Statex \textbf{Stage B: Obtain the orthogonal matrix $\widetilde{\bQ}$}
\State Set $\widetilde{\bQ} \leftarrow \bQ^\top $;
\For{$i=n-1$ to $k+1$} 
\State $c \leftarrow \frac{x_k}{\sqrt{x_k^2 + x_l^2}}$, $s\leftarrow\frac{x_l}{\sqrt{x_k^2 + x_l^2}}$, where $x_k$, $x_l$ are from step 5;
\State Calculate $\widetilde{\bQ} \leftarrow \bJ_{i,i+1}\widetilde{\bQ} $ in following two steps:
\State $i$-th row: $\widetilde{\bQ}_{i,:} \leftarrow c\cdot \widetilde{\bQ}_{i,:} + s \widetilde{\bQ}_{j,:} $, where $j=i+1$;
\State $(i+1)$-th row: $\bQ_{i+1,:} \leftarrow -s\cdot \widetilde{\bQ}_{i,:} + c \widetilde{\bQ}_{j,:} $, where $j=i+1$;
\EndFor
\State Output $\widetilde{\bQ} \leftarrow \widetilde{\bQ}^{\top}$ from Equation~\eqref{equation:qr-add-column-finalq}; 
\end{algorithmic} 
\end{algorithm}
\begin{theoremHigh}[Algorithm Complexity: QR Adding Column \citep{lu2021numerical}]\label{theorem:qr-full-givens-add-column}
Algorithm~\ref{alg:qr-adding-a-column} requires $\sim 2n^2+6(np+k^2-pk-nk)$ flops to compute a full QR decomposition of the matrix $\widetilde{\bX} \in \real^{n\times (p+1)}$, where we add a column to  the $(k+1)$-th column of $\bX\in \real^{n\times p}$ and the full QR decomposition of $\bX$ is known. Furthermore, if the orthogonal matrix $\widetilde{\bQ}$ needs to be formed explicitly, an additional $\sim 6n(n-k)$ flops are required.
\end{theoremHigh}

Note that the column number $k$ plays a significant role in determining the complexity. 
When $k=p$, the complexity is $2n^2$ flops. On the other hand, when $k=1$, the complexity reaches its maximum value.

\paragrapharrow{Other Real-world application.} 
The method described above is useful for performing efficient variable selection in least squares problems using QR decomposition. In this approach, we iteratively remove a column from the data matrix $\bX$ and perform an $F$-test to determine the significance of the corresponding variable. If the variable is deemed insignificant, it is removed, leading to a simpler model. A brief overview is given below; for more details, refer to Section~\ref{section:variable-selection}. 

Following the setup described in Section~\ref{section:application-ls-qr}, 
let's consider the overdetermined system $\bX\bbeta = \by$, where $\bX\in \real^{n\times p}$ represents the data matrix, and $\by\in \real^n$ with $n\geq p$ is the observed response. 
The LS solution is obtained by minimizing $\normtwo{\bX\bbeta-\by}^2$, and it can be expressed as $\bbeta_{LS} = (\bX^\top\bX)^{-1}\bX^\top\by$.

Suppose we remove a column from $\bX$ to obtain $\widehat{\bX}$. Consequently, the LS solution changes from $\bbeta_{LS}$ to $\widehat{\bbeta}_{LS}$.
Define 
$$
\begin{aligned}
RSS(\widehat{\bbeta}_{LS}) &\triangleq \normtwo{\by - \widehat{\by}_{LS}}^2, \qquad \text{where~ } \widehat{\by}_{LS} = \widehat{\bX}\widehat{\bbeta}_{LS},   \\
RSS(\bbeta_{LS})&\triangleq \normtwo{\by - \by_{LS}}^2, \qquad \text{where~ } \by_{LS} = \bX\bbeta_{LS},\\
\bH & \triangleq \bX(\bX^\top\bX)^{-1}\bX^\top, \\
\widehat{\bH} &\triangleq \widehat{\bX}(\widehat{\bX}^\top \widehat{\bX})^{-1}\widehat{\bX}^\top. \\
\end{aligned}
$$
Suppose the \textit{reduced} QR decompositions of $\bX$ and $\widehat{\bX}$ are given by $\bX=\bQ\bR$ and $\widehat{\bX}=\widehat{\bQ}\widehat{\bR}$.
Thus $RSS(\bbeta_{LS}) = \by^\top (\bI-\bH)\by = \by^\top\by - (\by^\top\bQ)(\bQ^\top\by)$
and $RSS(\widehat{\bbeta}_{LS}) - RSS(\bbeta_{LS}) = \normtwo{\by_{LS} - \widehat{\by}_{LS}}^2 = \by^\top(\bH-\widehat{\bH})\by=(\by^\top\bQ)(\bQ^\top\by)-(\by^\top\widehat{\bQ})(\widehat{\bQ}^\top\by)$, which are the differences of two inner products.
It can be shown that $RSS(\bbeta_{LS})\sim \sigma^2 \chi^2_{(n-p)}$, which follows a Chi-square distribution, and $\sigma$ is the noise level. Under the hypothesis that the deleted column is not significant, we could conclude that 
$$
T={\frac{1}{p-q}\big(RSS(\widehat{\bbeta}_{LS}) - RSS(\bbeta_{LS})\big)  }/{\frac{1}{n-p}RSS(\bbeta_{LS})} \sim F_{p-q,n-p},
$$
which is the \textbf{test statistic for the $F$-test} with $q=p-1$.
Suppose we have the data set $(\bx_1, y_1)$, $(\bx_2, y_2)$, $\ldots$, $(\bx_n, y_n)$, and we observe $T=t$ for this particular data set. Then 
$$
\widetildep=P[T((\bx_1, y_1), (\bx_2, y_2), \ldots, (\bx_n, y_n)) \geq t] = P[F_{p-q,n-p} \geq t].
$$
We reject the hypothesis (i.e., the  variable $k$ should be removed) if $\widetildep<\alpha$, for some small $\alpha$, say 0.05. This is known as the \textit{p-value}.

\subsection{Modifying LS: Appending or Deleting a Data/Row}\label{section:qr_adddelrow}
Similarly, we may also need to append or delete a row/data from the observed matrix in the QR case. 
In such cases, our objective is to efficiently compute the QR decomposition of the modified matrix.

\paragrapharrow{Appending a row.}
Suppose the full QR decomposition of $\bX\in \real^{n\times p}$ is given by $\bX= \scriptsize\begin{bmatrix}
\bX_1 \\
\bX_2 
\end{bmatrix}=\bQ\bR$, where $\bX_1\in \real^{k\times p}$ and $\bX_2 \in \real^{(n-k)\times p}$. Now, if we append a row such that $\bX^\prime = \scriptsize\begin{bmatrix}
\bX_1 \\
\bw^\top\\
\bX_2 
\end{bmatrix} \in \real^{(n+1)\times p}$, we aim to efficiently obtain the full QR decomposition of $\bX^\prime$. 
To achieve this, we construct a permutation matrix, denoted by $\bP$, such that
$$
\bP=
\begin{bmatrix}
\bzero & 1 & \bzero  \\
\bI_k & \bzero & \bzero \\
\bzero & \bzero & \bI_{n-k}
\end{bmatrix}
\longrightarrow
\bP 
\begin{bmatrix}
\bX_1 \\
\bw^\top\\
\bX_2 
\end{bmatrix}
=
\begin{bmatrix}
\bw^\top\\
\bX_1 \\
\bX_2 
\end{bmatrix}
\quad 
\implies
\quad
\begin{bmatrix}
1 & \bzero \\
\bzero & \bQ^\top 
\end{bmatrix} 
\bP 
\bX^\prime 
=
\begin{bmatrix}
\bw^\top \\
\bR 
\end{bmatrix}=\bH 
$$
is upper Hessenberg. Similarly, a set of rotations $\bG_{12}, \bG_{23}, \ldots, \bG_{p,p+1}$ can be applied to introduce zeros in the elements $h_{21}$, $h_{32}$, $\ldots$, $h_{p+1,p}$ of $\bH$. The resulting triangular matrix $\bR^\prime$ is given by 
$$
\bR^\prime \triangleq \bG_{p,p+1}\ldots \bG_{23}\bG_{12}\begin{bmatrix}
1 & \bzero \\
\bzero & \bQ^\top 
\end{bmatrix} 
\bP   \bX^\prime.
$$
And the orthogonal matrix  is obtained by
$$
\bQ^\prime \triangleq \left(\bG_{p,p+1}\ldots \bG_{23}\bG_{12}\begin{bmatrix}
1 & \bzero \\
\bzero & \bQ^\top 
\end{bmatrix} 
\bP \right)^\top 
= 
\bP^\top 
\begin{bmatrix}
1 & \bzero \\
\bzero & \bQ 
\end{bmatrix}  
\bG_{12}^\top  \bG_{23}^\top \ldots \bG_{p,p+1}^\top.
$$
Thus, the QR decomposition of $\bX^\prime$ is  $\bX^\prime = \bQ^\prime\bR^\prime$.

\paragraph{Update of least squares problem.}
In the context of least squares, each row of $\bX$ and $\by$ is  referred to as an \textit{observation}.
In real-world application, new observation may be received. 
When performing the optimization process from scratch, obtaining the solution of the least squares problem would require approximately $\sim 2(n+1)p^2-\frac{2}{3}p^3$ flops. 
Let's consider a new observation $[\bx^\top,b]$, leading to the following reduction:
$$
\left[
\begin{array}{c|c}
\bX & \by  \\
\bx^\top & b
\end{array}
\right]
\rightarrow 
\begin{bmatrix}
\bQ^\top & \bzero \\
\bzero & 1
\end{bmatrix}
\left[
\begin{array}{c|c}
\bX & \by  \\
\bx^\top & b
\end{array}
\right]
=
\underbrace{\begin{bmatrix}
\bR_1 & \bQ_1^\top\by \\
\bzero & \bQ_2^\top\by \\
\bx^\top & b
\end{bmatrix}}_{\triangleq \bZ}
$$
Therefore, the updated least squares solution is obtained by transforming $\bZ\in\real^{(n+1)\times (p+1)}$ into an upper triangular matrix (actually, we transform the left $p$ columns of $\bZ$ into an upper triangular matrix). This can be done by a set of rotations in the $(1,n+1)$ plane, $(2,n+1)$ plane, $\ldots$, $(p,n+1)$ plane that introduce zero to $(n+1, 1), (n+1,2),\ldots,(n+1,p)$-th entry of $\bZ$, respectively. 
The computational cost for this operation is $\mathcalO(np)$ flops.

\paragrapharrow{Deleting a row.} 
Conversely, suppose $\bX = 
\scriptsize
\begin{bmatrix}
\bX_1 \\
\bw^\top\\
\bX_2 
\end{bmatrix} \in \real^{n\times p}$, where $\bX_1\in\real^{k\times p}$ and $\bX_2 \in \real^{(n-k-1)\times p}$. The full QR decomposition of $\bX$ is given by $\bX=\bQ\bR$, where $\bQ\in \real^{n\times n}$ and $\bR\in \real^{n\times p}$. We want to compute the full QR decomposition of 
$\widetilde{\bX} =  
\footnotesize
\begin{bmatrix}
\bX_1 \\
\bX_2 
\end{bmatrix}$ efficiently (assuming $n-1\geq p$). Analogously, we can construct a permutation matrix 
$$
\bP = 
\begin{bmatrix}
\bzero & 1 & \bzero  \\
\bI_k & \bzero & \bzero \\
\bzero & \bzero & \bI_{n-k-1}
\end{bmatrix}
\implies 
\bP\bX = 
\begin{bmatrix}
\bzero & 1 & \bzero  \\
\bI_k & \bzero & \bzero \\
\bzero & \bzero & \bI_{n-k-1}
\end{bmatrix}
\begin{bmatrix}
\bX_1 \\
\bw^\top\\
\bX_2 
\end{bmatrix}=
\begin{bmatrix}
\bw^\top \\
\bX_1\\
\bX_2 
\end{bmatrix} = \bP\bQ\bR \triangleq\bM\bR ,
$$
where $\bM \triangleq \bP\bQ$ is an orthogonal matrix. 
Let $\bmm^\top$ be the first row of $\bM$, and construct a set of Givens rotations $\bG_{n-1,n}, \bG_{n-2,n-1}, \ldots, \bG_{1,2}$, which introduce zeros in the elements $m_n, m_{n-1}, \ldots, m_2$ of $\bmm$, respectively. 
By applying these rotations, we can obtain $\bG_{1,2}\ldots \bG_{n-2,n-1}\bG_{n-1,n}\bmm = \alpha \be_1$, where $\alpha = \pm 1$. Therefore, we obtain the following result:
$$
\bG_{1,2}\ldots \bG_{n-2,n-1}\bG_{n-1,n} \bR \triangleq
\begin{blockarray}{cc}
\begin{block}{[c]c}
\bv^\top & 1 \\
\bR_1& n-1  \\
\end{block}
\end{blockarray},
$$ 
which is upper Hessenberg with $\bR_1\in \real^{(n-1)\times p}$ being upper triangular. And 
$$
\bM   \bG_{n-1,n}^\top \bG_{n-2,n-1}^\top \ldots  \bG_{1,2}^\top  \triangleq
\begin{bmatrix}
\alpha & \bzero \\
\bzero & \bQ_1 
\end{bmatrix},
$$
where $\bQ_1\in \real^{(n-1)\times (n-1)}$ is an orthogonal matrix. The bottom-left block of the  matrix above is a zero vector since $\alpha=\pm 1$ and $\bM$ is orthogonal. To see this, let $\bG\triangleq\bG_{n-1,n}^\top \bG_{n-2,n-1}^\top $ $\ldots  \bG_{1,2}^\top $, with its first  column denoted as  $\bg$. Writing $\bM$ as the row partition $\bM = [\bmm^\top; \bmm_2^\top; \bmm_3^\top; \ldots, \bmm_{n}^\top]$,  we have
$$
\begin{aligned}
\bmm^\top\bg &= \pm 1 \qquad  \rightarrow \qquad  \bg = \pm \bmm, \\
\bmm_i^\top \bmm &=0,  \qquad \forall\, i \in \{2,3,\ldots,n\}.
\end{aligned}
$$
This results in 
$$
\begin{aligned}
\bP\bX&=\bM\bR
=(\bM   \bG_{n-1,n}^\top \bG_{n-2,n-1}^\top \ldots  \bG_{1,2}\top ) (\bG_{1,2}\ldots \bG_{n-2,n-1}\bG_{n-1,n} \bR ) \\
&=
\begin{bmatrix}
\alpha & \bzero \\
\bzero & \bQ_1 
\end{bmatrix}
\begin{bmatrix}
\bv^\top \\
\bR_1 
\end{bmatrix} = 
\begin{bmatrix}
\alpha \bv^\top \\
\bQ_1\bR_1 
\end{bmatrix}
=
\begin{bmatrix}
\bw^\top \\
\widetilde{\bX}
\end{bmatrix}
.
\end{aligned}
$$
Thus, we obtain that $\bQ_1\bR_1$ is the full QR decomposition of $\widetilde{\bX}=
\begin{bmatrixfoot}
\bX_1\\
\bX_2 
\end{bmatrixfoot}$.

%
%
%

\index{Rank-one update}
\subsection{Modifying LS: Rank-One Changes}\label{section:qr-rank-one-changes}
We have previously explored the rank-one update and downdate of the Cholesky decomposition in Section~\ref{section:cholesky-rank-one-update}. 
The rank-one change of the matrix $\bX$ in the QR decomposition, denoted by $\bX^\prime$, is defined in a similar manner:
$$
\begin{aligned}
\bX^\prime &= \bX + \bu\bv^\top, \\
\downarrow &\gap  \downarrow\\
\bQ^\prime\bR^\prime &=\bQ\bR + \bu\bv^\top,
\end{aligned}
$$
where if we set $\bX^\prime = \bX - (-\bu)\bv^\top$, we recover the downdate form such that the update and downdate in the QR decomposition are equivalent. 

To reiterate,
the rank-one update/downdate problem involves finding the QR decomposition of $\bX^\prime$ when the QR decomposition of $\bX$ has already been computed.
Let $\bw \triangleq \bQ^\top\bu$, then we have
$
\bX^\prime = \bQ(\bR + \bw\bv^\top).
$
Similarly, 
there exists a set of Givens rotations $\bG_{12} \bG_{23} \ldots \bG_{(p-1),p}$ that satisfy:
$$
\bG_{12} \bG_{23} \ldots \bG_{(p-1),p} \bw = \pm \normtwo{\bw} \be_1,
$$
where $\bG_{(k-1),k}$ represents the Givens rotation effecting in the $(k-1)$-th and $k$-th plane, which introduces zero in the $k$-th entry of $\bw$. 

Applying these rotations to $\bR$, we obtain
$
\bG_{12} \bG_{23} \ldots \bG_{(p-1),p}\bR \triangleq \bH_0 ,
$
where the Givens rotations in this \textit{reverse order} (\textit{backward rotations}) are employed to transform the upper triangular matrix $\bR$ into a ``simple" \textit{upper Hessenberg matrix}.
However, if the rotations transform $\bw$ into $\pm \normtwo{\bw}\be_1$ in the \textit{forward order} (\textit{forward rotations}), we will not obtain this upper Hessenberg $\bH_0$. 

To see this, consider a $5\times 5$ matrix $\bR$, an example is shown as follows, where $\boxtimes$ represents a value that is not necessarily zero, and \textbf{boldface} indicates the value has just been changed. Using backward rotations results in the upper Hessenberg $\bH_0$, which is easier to manage to update the QR decomposition:
$$
\begin{aligned}
\text{\parbox{7em}{Backward\\(Right Way)}: }
\footnotesize
\begin{sbmatrix}{\bR}
\boxtimes & \boxtimes & \boxtimes & \boxtimes& \boxtimes \\
0 & \boxtimes & \boxtimes & \boxtimes& \boxtimes \\
0 & 0 & \boxtimes & \boxtimes& \boxtimes \\
0 & 0 & 0 & \boxtimes& \boxtimes \\
0 & 0 & 0 & 0& \boxtimes
\end{sbmatrix}
&\stackrel{\bG_{45}}{\rightarrow}
\footnotesize
\begin{sbmatrix}{\bG_{45}\bR}
\boxtimes & \boxtimes & \boxtimes & \boxtimes& \boxtimes \\
0 & \boxtimes & \boxtimes & \boxtimes& \boxtimes \\
0 & 0 & \boxtimes & \boxtimes& \boxtimes \\
0 & 0 & 0 & \bm{\boxtimes}& \bm{\boxtimes} \\
0 & 0 & 0 & \bm{\boxtimes}& \bm{\boxtimes}
\end{sbmatrix}
\stackrel{\bG_{34}}{\rightarrow}
\footnotesize
\begin{sbmatrix}{\bG_{34}\bG_{45}\bR}
\boxtimes & \boxtimes & \boxtimes & \boxtimes& \boxtimes \\
0 & \boxtimes & \boxtimes & \boxtimes& \boxtimes \\
0 & 0 & \bm{\boxtimes} & \bm{\boxtimes}& \bm{\boxtimes} \\
0 & 0 & \bm{\boxtimes} & \bm{\boxtimes}& \bm{\boxtimes} \\
0 & 0 & 0 & \boxtimes & \boxtimes
\end{sbmatrix}\\
&\stackrel{\bG_{23}}{\rightarrow}
\footnotesize
\begin{sbmatrix}{\bG_{23}\bG_{34}\bG_{45}\bR}
\boxtimes & \boxtimes & \boxtimes & \boxtimes& \boxtimes \\
0 & \bm{\boxtimes} & \bm{\boxtimes} & \bm{\boxtimes}& \bm{\boxtimes} \\
0 & \bm{\boxtimes} & \bm{\boxtimes} & \bm{\boxtimes}& \bm{\boxtimes} \\
0 & 0 & \boxtimes & \boxtimes& \boxtimes \\
0 & 0 & 0 & \boxtimes& \boxtimes
\end{sbmatrix}
\stackrel{\bG_{12}}{\rightarrow}
\footnotesize
\begin{sbmatrix}{\bG_{12}\bG_{23}\bG_{34}\bG_{45}\bR}
\bm{\boxtimes} & \bm{\boxtimes} & \bm{\boxtimes} & \bm{\boxtimes}& \bm{\boxtimes} \\
\bm{\boxtimes} & \bm{\boxtimes} & \bm{\boxtimes} & \bm{\boxtimes}& \bm{\boxtimes} \\
0 & \boxtimes & \boxtimes & \boxtimes& \boxtimes \\
0 & 0 & \boxtimes & \boxtimes& \boxtimes \\
0 & 0 & 0 & \boxtimes& \boxtimes
\end{sbmatrix}.
\end{aligned}
$$
And the forward rotations result in a \textbf{full (non-sparse) matrix}:
$$
\begin{aligned}
\text{\parbox{7em}{Forward\\(Wrong Way)}: }
\footnotesize
\begin{sbmatrix}{\bR}
\boxtimes & \boxtimes & \boxtimes & \boxtimes& \boxtimes \\
0 & \boxtimes & \boxtimes & \boxtimes& \boxtimes \\
0 & 0 & \boxtimes & \boxtimes& \boxtimes \\
0 & 0 & 0 & \boxtimes& \boxtimes \\
0 & 0 & 0 & 0& \boxtimes
\end{sbmatrix}
&\stackrel{\bG_{12}}{\rightarrow}
\footnotesize
\begin{sbmatrix}{\bG_{12}\bR}
\bm{\boxtimes} & \bm{\boxtimes} & \bm{\boxtimes} & \bm{\boxtimes}& \bm{\boxtimes} \\
\bm{\boxtimes} & \bm{\boxtimes} & \bm{\boxtimes} & \bm{\boxtimes}& \bm{\boxtimes} \\
0 & 0 & \boxtimes & \boxtimes& \boxtimes \\
0 & 0 & 0 & \boxtimes& \boxtimes \\
0 & 0 & 0 & 0& \boxtimes
\end{sbmatrix}
\stackrel{\bG_{23}}{\rightarrow}
\footnotesize
\begin{sbmatrix}{\bG_{23}\bG_{12}\bR}
\boxtimes & \boxtimes & \boxtimes & \boxtimes& \boxtimes \\
\bm{\boxtimes} & \bm{\boxtimes} & \bm{\boxtimes} & \bm{\boxtimes}& \bm{\boxtimes} \\
\bm{\boxtimes} & \bm{\boxtimes} & \bm{\boxtimes} & \bm{\boxtimes}& \bm{\boxtimes} \\
0 & 0 & 0 & \boxtimes& \boxtimes \\
0 & 0 & 0 & 0& \boxtimes
\end{sbmatrix}\\
&\stackrel{\bG_{34}}{\rightarrow}
\footnotesize
\begin{sbmatrix}{\bG_{34}\bG_{23}\bG_{12}\bR}
\boxtimes & \boxtimes & \boxtimes & \boxtimes& \boxtimes \\
\boxtimes & \boxtimes & \boxtimes & \boxtimes& \boxtimes \\
\bm{\boxtimes} & \bm{\boxtimes} & \bm{\boxtimes} & \bm{\boxtimes}& \bm{\boxtimes} \\
\bm{\boxtimes} & \bm{\boxtimes} & \bm{\boxtimes} & \bm{\boxtimes}& \bm{\boxtimes} \\
0 & 0 & 0 & 0& \boxtimes
\end{sbmatrix}
\stackrel{\bG_{45}}{\rightarrow}
\footnotesize
\begin{sbmatrix}{\bG_{45}\bG_{34}\bG_{23}\bG_{12}\bR}
\boxtimes & \boxtimes & \boxtimes & \boxtimes& \boxtimes \\
\boxtimes & \boxtimes & \boxtimes & \boxtimes& \boxtimes \\
\boxtimes & \boxtimes & \boxtimes & \boxtimes& \boxtimes \\
\bm{\boxtimes} & \bm{\boxtimes} & \bm{\boxtimes} & \bm{\boxtimes}& \bm{\boxtimes} \\
\bm{\boxtimes} & \bm{\boxtimes} & \bm{\boxtimes} & \bm{\boxtimes}& \bm{\boxtimes} \\
\end{sbmatrix}.
\end{aligned}
$$
That is,  backward rotations will preserve a lot of the zeros as they are, whereas the forward rotations will eliminate these zeros.
In general,  backward rotations yield
$$
\bG_{12} \bG_{23} \ldots \bG_{(p-1),p} (\bR+\bw\bv^\top) = \bH_0  \pm \normtwo{\bw} \be_1 \bv^\top \triangleq \bH, 
$$
which is  in upper Hessenberg form. 
We can thus find a sequence of rotations $\bJ_{12}, \bJ_{23}, \ldots, \bJ_{(p-1),p}$ such that
$$
\bJ_{(p-1),p} \ldots \bJ_{23}\bJ_{12}\bH \triangleq \bR^\prime
$$
is upper triangular. 
Following the example of the $5\times 5$ matrix, the triangularization process is presented below:
$$
\begin{aligned}
\underbrace{\bH_0  \pm \normtwo{\bw} \be_1 \bv^\top}_{\bH} =
\footnotesize
\begin{sbmatrix}{\bH}
\boxtimes & \boxtimes & \boxtimes & \boxtimes & \boxtimes \\
\boxtimes & \boxtimes & \boxtimes & \boxtimes & \boxtimes \\
0 & \boxtimes & \boxtimes & \boxtimes& \boxtimes \\
0 & 0 & \boxtimes & \boxtimes& \boxtimes \\
0 & 0 & 0 & \boxtimes& \boxtimes
\end{sbmatrix}
&\stackrel{\bJ_{12}}{\rightarrow}
\footnotesize
\begin{sbmatrix}{\bJ_{12}\bH}
\bm{\boxtimes} & \bm{\boxtimes} & \bm{\boxtimes} & \bm{\boxtimes} & \bm{\boxtimes} \\
\bm{0} & \bm{\boxtimes} & \bm{\boxtimes} & \bm{\boxtimes} & \bm{\boxtimes} \\
0 & \boxtimes & \boxtimes & \boxtimes& \boxtimes \\
0 & 0 & \boxtimes & \boxtimes& \boxtimes \\
0 & 0 & 0 & \boxtimes& \boxtimes
\end{sbmatrix}
\stackrel{\bJ_{23}}{\rightarrow}
\footnotesize
\begin{sbmatrix}{\bJ_{23}\bJ_{12}\bH}
\boxtimes & \boxtimes & \boxtimes & \boxtimes & \boxtimes \\
0 & \bm{\boxtimes} & \bm{\boxtimes} & \bm{\boxtimes} & \bm{\boxtimes} \\
0 & \bm{0} & \bm{\boxtimes} & \bm{\boxtimes} & \bm{\boxtimes} \\
0 & 0 & \boxtimes & \boxtimes& \boxtimes \\
0 & 0 & 0 & \boxtimes& \boxtimes
\end{sbmatrix}\\
&\stackrel{\bJ_{34}}{\rightarrow}
\footnotesize
\begin{sbmatrix}{\bJ_{34}\bJ_{23}\bJ_{12}\bH}
\boxtimes & \boxtimes & \boxtimes & \boxtimes & \boxtimes \\
0 & \boxtimes & \boxtimes & \boxtimes & \boxtimes \\
0 & 0 & \bm{\boxtimes} & \bm{\boxtimes} & \bm{\boxtimes} \\
0 & 0 & \bm{0} & \bm{\boxtimes} & \bm{\boxtimes} \\
0 & 0 & 0 & \boxtimes& \boxtimes
\end{sbmatrix}
\stackrel{\bJ_{45}}{\rightarrow}
\footnotesize
\begin{sbmatrix}{\bJ_{45}\bJ_{34}\bJ_{23}\bJ_{12}\bH}
\boxtimes & \boxtimes & \boxtimes & \boxtimes & \boxtimes \\
0 & 0 & \boxtimes & \boxtimes & \boxtimes \\
0 & 0 & \boxtimes & \boxtimes& \boxtimes \\
0 & 0 & 0 & \bm{\boxtimes} & \bm{\boxtimes} \\
0 & 0 & 0 & \bm{0} & \bm{\boxtimes} \\
\end{sbmatrix}.
\end{aligned}
$$
And the QR decomposition of $\bX^\prime$ is then given by 
$$
\bX^\prime = \bQ^\prime \bR^\prime,
$$
where 
\begin{equation}\label{equation:qr-rank-one-update}
\left\{
\begin{aligned}
\bR^\prime &\triangleq(\bJ_{(p-1),p} \ldots \bJ_{23}\bJ_{12}) (\bG_{12} \bG_{23} \ldots \bG_{(p-1),p}) (\bR+\bw\bv^\top);\\
\bQ^\prime &\triangleq \bQ\left\{(\bJ_{(p-1),p} \ldots \bJ_{23}\bJ_{12}) (\bG_{12} \bG_{23} \ldots \bG_{(p-1),p}) \right\}^\top; \\
\text{(or) }\bQ^{\prime\top}&\triangleq \left\{(\bJ_{(p-1),p} \ldots \bJ_{23}\bJ_{12}) (\bG_{12} \bG_{23} \ldots \bG_{(p-1),p}) \right\}\bQ^\top .
\end{aligned}
\right.
\end{equation}
The procedure is outlined in Algorithm~\ref{alg:qr-rankoneChange}.

\begin{algorithm}[htp] 
\caption{QR Rank-One Changes} 
\label{alg:qr-rankoneChange} 
\begin{algorithmic}[1] 
\Require Matrix $\bX \in \real^{p\times p}$ with QR decomposition $\bX=\bQ\bR$, and $\bX^\prime = \bX+\bu\bv^\top$; 
\Statex \textbf{Stage A: Transfer $\bw$ to a first basis vector, $\bw\rightarrow \normtwo{\bw}\be_1$}
\State Calculate $\bw\leftarrow\bQ^\top\bu$; 
\State Calculate $\bH \leftarrow \bR$;
\For{$i=p-1$ to $1$} 
\State Get Givens rotation $\bG_{i,i+1}$ with the following parameters $c, s$:
\State $c \leftarrow \frac{x_k}{\sqrt{x_k^2 + x_l^2}}$, $s\leftarrow\frac{x_l}{\sqrt{x_k^2 + x_l^2}}$, where $x_k = \bw_i$, $x_l = \bw_{i+1}$; 
\State Calculate $\bH \leftarrow \bG_{i,i+1}\bH $ in following two steps:
\State $i$-th row: $\bH_{i,:} \leftarrow c\cdot \bH_{i,:} + s\cdot \bH_{j,:} $, where $j=i+1$; 
\State $(i+1)$-th row: $\bH_{i+1,:} \leftarrow -s\cdot \bH_{i,:} + c\cdot \bH_{j,:} $, where $j=i+1$; 
\EndFor

\Statex \textbf{Stage B: Triangularize $\bR^\prime$}
\State Set $\bR^\prime \leftarrow\bH \pm \normtwo{\bw} \be_1 \bv^\top$; 
\Comment{$\bH, \bR^\prime$ are both upper Hessenberg}
\For{$i=1$ to $p-1$} 
\State Get Givens rotation $\bJ_{i,i+1}$ with the following parameters $c, s$:
\State $c \leftarrow \frac{x_k}{\sqrt{x_k^2 + x_l^2}}$, $s\leftarrow\frac{x_l}{\sqrt{x_k^2 + x_l^2}}$, where $x_k = \bH_{i,i}$, $x_l = \bH_{i+1,i}$;
\State Calculate $\bR^\prime \leftarrow \bJ_{i,i+1}\bR^\prime $ in following two steps:
\State $i$-th row: $\bR^\prime_{i,:} \leftarrow c\cdot \bR^\prime_{i,:} + s\cdot \bR^\prime_{j,:} $, where $j=i+1$;
\State $(i+1)$-th row: $\bR^\prime_{i+1,:} \leftarrow -s\cdot \bR^\prime_{i,:} + c\cdot \bR^\prime_{j,:} $, where $j=i+1$;
\EndFor
\State Output $\bR^\prime$; 

\Statex \textbf{Stage C: Obtain the orthogonal matrix $\bQ^\prime$}
\State Set $\bQ^{\prime\top} = \bQ^\top$;
\For{$i=p-1$ to $1$}    \Comment{The following $c,s$ are from step 5}
\State $i$-th row: $\bQ^{\prime\top}_{i,:} \leftarrow c\cdot \bQ^{\prime\top}_{i,:} + s\cdot \bQ^{\prime\top}_{j,:} $, where $j=i+1$;  
\State $(i+1)$-th row: $\bQ^{\prime\top}_{i+1,:} \leftarrow -s\cdot \bQ^{\prime\top}_{i,:} + c\cdot \bQ^{\prime\top}_{j,:} $, where $j=i+1$; 
\EndFor
\For{$i=1$ to $p-1$} \Comment{The following $c,s$ are from step 13}
\State $i$-th row: $\bQ^{\prime\top}_{i,:} \leftarrow c\cdot \bQ^{\prime\top}_{i,:} + s\cdot \bQ^{\prime\top}_{j,:} $, where $j=i+1$; 
\State $(i+1)$-th row: $\bQ^{\prime\top}_{i+1,:} \leftarrow -s\cdot \bQ^{\prime\top}_{i,:} + c\cdot \bQ^{\prime\top}_{j,:} $, where $j=i+1$;
\EndFor
\State Output $\bQ^\prime$;

\end{algorithmic} 
\end{algorithm}

We state the complexity of the rank-one update in the following theorem.
\begin{theoremHigh}[Algorithm complexity: QR rank-one change \citep{lu2021numerical}]\label{theorem:qr-full-givens-rank1}
Algorithm~\ref{alg:qr-rankoneChange} requires $\sim 8p^2$ flops to compute the full QR decomposition of an $\bX^\prime \in \real^{p\times p}$ matrix with a rank-one change to $\bX$, given that the full QR decomposition of $\bX$ is already known. Furthermore, if the orthogonal matrix $\bQ^\prime$ needs to be formed explicitly, an additional $\sim 12p^2$ flops are required.
\end{theoremHigh}

The algorithm can be easily applied to a rectangular matrix $\bX\in \real^{n\times p}$ or to the sum  $\bX+\bU\bV^\top$, where $\bU\in \real^{n\times k}$ and $\bV\in \real^{p\times k}$; see \citet{golub2013matrix} for more details.

\subsection{Rank-Deficient Case}

We now show that there exists a column permutation matrix $\bP$ such that, in the QR decomposition of $\bX \bP$, all diagonal elements equal to zero appear at the end.
This is referred to as the \textit{column-pivoted QR (CPQR)} decomposition, or alternatively, the \textit{rank-deficient QR} decomposition.
\begin{theoremHigh}[Column-pivoted QR decomposition]\label{theorem:rank-revealing-qr-general}
Ler $\bX  \in \real^{n\times p}$ with rank $r$ such that $r < p \leq n$. 
Then there exist a permutation matrix $\bP$ and an orthogonal matrix $\bQ \in \real^{n \times n}$ such that
$$
\bX \bP = \bQ \begin{bmatrix}
\bR_{11} & \bR_{12} \\
\bzero & \bzero
\end{bmatrix},
$$
where $\bR_{11} \in \real^{r \times r}$ is upper triangular with positive diagonal elements.
\end{theoremHigh}
\begin{proof}[of Theorem~\ref{theorem:rank-revealing-qr-general}]
Since $\rank(\bX) = r$, we can always choose a permutation matrix $\bP$ such that $\bX \bP = [\bX_1 , \bX_2]$, where $\bX_1 \in \real^{n \times r}$ has linearly independent columns. The QR decomposition
$$
\bQ^\top \bX_1 = \begin{bmatrix}
\bR_{11} \\
\bzero
\end{bmatrix}, \quad \bQ = \begin{bmatrix}
\bQ_1 & \bQ_2
\end{bmatrix}
$$
uniquely determines $\bQ_1 \in \real^{n \times r}$ and $\bR_{11} \in \real^{r \times r}$ with positive diagonal elements. Then
$$
\bQ^\top \bX \bP = \begin{bmatrix}
\bQ^\top \bX_1 & \bQ^\top \bX_2
\end{bmatrix} = \begin{bmatrix}
\bR_{11} & \bR_{12} \\
\bzero & \bzero
\end{bmatrix}
$$
has rank $r$. Here $\bR_{22} = \bzero$, because $\bR$ cannot have more than $r$ linearly independent rows. 
This completes the proof.
\end{proof}

From the CPQR decomposition and orthogonal invariance  of the $\ell_2$ norm, it follows that the least squares problem $\min_{\bbeta} \normtwo{\bX\bbeta - \by}$ is equivalent to
\begin{equation}\label{equation:ls_cpqr}
\min_{\bbeta} \normtwo{\bQ^\top\bX\bP\bP^\top\bbeta - \bQ^\top\by}
\equiv
\min_{\widetildebbeta} \normtwo{\begin{bmatrix}
\bR_{11} & \bR_{12} \\
\bzero & \bzero
\end{bmatrix} \begin{bmatrix}
\widetildebbeta_1 \\
\widetildebbeta_2
\end{bmatrix} - \begin{bmatrix}
\bd_1 \\
\bd_2
\end{bmatrix}},
\end{equation}
where 
$\bX \bP = \bQ 
\footnotesize
\begin{bmatrix}
\bR_{11} & \bR_{12} \\
\bzero & \bzero
\end{bmatrix}$ is the CPQR of $\bX\in\real^{n\times p}$, $\bd \triangleq \bQ^\top \by$ and $\widetildebbeta \triangleq \bP^\top\bbeta$ are partitioned conformally. The general solution of \eqref{equation:ls_cpqr} is given by $\bbeta = \bP \footnotesize\begin{bmatrix} \widetildebbeta_1 \\ \widetildebbeta_2 \end{bmatrix}$, where
$
\bR_{11} \widetildebbeta_1 + \bR_{12} \widetildebbeta_2= \bd_1 ,
$
and $\widetildebbeta_2$ can be chosen arbitrarily. For $\widetildebbeta_2 = \bzero$, we obtain a {basic least squares solution} 
$\widetildebbeta = [\widetildebbeta_r^\top, \bzero]^\top$, and
\begin{equation}\label{equation:basic_sol_cpqr}
\widehatbbeta_r = \bP 
\begin{bmatrix} \widetildebbeta_r \\ 
	\bzero \end{bmatrix}, 
\qquad \widetildebbeta_r 
\triangleq \bR_{11}^{-1} \bd_1,
\end{equation}
with at most $r = \rank(\bX)$ nonzero components. The general solution in terms of $\widetildebbeta_2\in\real^{p-r}$ (which can vary) is given by
\begin{equation}\label{equation:gen_sol_cpqr}
\widehatbbeta = \bP 
\begin{bmatrix} 
\widetildebbeta_r - \bZ \widetildebbeta_2 \\
\widetildebbeta_2 
\end{bmatrix}, 
\qquad \bZ \triangleq \bR_{11}^{-1} \bR_{12},
\end{equation}
where $\bZ$ can be computed in  $\sim r^2(p-r)$ flops by solving the matrix equation $\bR_{11} \bZ = \bR_{12}$ using backward substitution.

\paragrapharrow{Smoothing LS and minimum-norm problems.}
Another general approach to address rank deficiency is to find the solution to the least squares problem
\begin{equation}\label{equation:smoo_ls_cpqr}
\min_{\bbeta \in \sB} \normtwo{\bB \bbeta}, \quad\sB \triangleq \{ \bbeta \in \real^p \mid \normtwo{\by - \bX\bbeta} = \min \},
\end{equation}
where the matrix $\bB$ can be chosen so that $\normtwo{\bB\bbeta}$ is a measure of the smoothness of $\bbeta$. 
Similar to the full-rank case \eqref{equation:qrfulo_minnom}, 
substituting the general solution \eqref{equation:gen_sol_cpqr} shows that the solution to \eqref{equation:smoo_ls_cpqr} is given by
\begin{equation}\label{equation:smooth_sol_cpqr}
\min_{\bb\in\real^{p-r}} 
\normtwo{\bB
\begin{bmatrix} \bZ \\ 
-\bI_{p-r} 
\end{bmatrix} \bb 
- \bB
\begin{bmatrix} 
\widetildebbeta_r \\ 
\bzero
\end{bmatrix}}.
\end{equation}
In particular, taking $\bB= \bI$ reduces to the minimum-norm problem, which minimizes
$$
\normtwo{\bbeta}^2 =\normtwo{\bZ \bb}^2 + \normtwo{\bb}^2,
$$
and gives the pseudo-inverse solution $\widehatbbeta = \bX^+\by$. 
It can be shown that 
$$
\nspace(\bX \bP) = \cspace \left(\begin{bmatrix} \bZ \\ -\bI_{p-r} \end{bmatrix}\right)
$$
forms a (nonorthonormal) basis for $\nspace(\bX \bP)$. QR factorization gives an orthonormal basis for $\nspace(\bX \bP)$. Note that the unique pseudo-inverse solution orthogonal to $\nspace(\bX \bP)$ (see Theorem~\ref{theorem:unif_ls}) equals the residual of the least squares problem \eqref{equation:smooth_sol_cpqr} with $\bB= \bI$,
$$
\widehatbbeta = 
\bP
\begin{bmatrix} 
\widetildebbeta_r \\ 
\bzero \end{bmatrix} - 
\bP\begin{bmatrix} 
\bZ \\ 
-\bI_{p-r} 
\end{bmatrix} 
\widehatbb,
\quad \text{with} \quad 
\widehatbb 
=
\argmin_{\bb}\normtwo{\bZ \bb}^2 + \normtwo{\bb}^2.
$$
Observe that this expression takes the form of the basic solution minus a correction term lying in the null space of  $\bX \bP$. Any particular solution can be substituted for $\bb$ in \eqref{equation:smooth_sol_cpqr}.

\subsection*{Computing the CPQR Decomposition}\label{section:piv_qr}

For many applications it is preferable to use a column-pivoted QR factorization (CPQR), in which the pivot column at step $k$ is chosen to maximize the diagonal element $r_{kk}$.

We now explain how to implement this strategy using the Modified Gram–Schmidt (MGS) process. 
Suppose that after $(k-1)$ steps, the nonpivotal columns are transformed according to
$$
\bx_j^{(k)} = \bx_j - \sum_{i=1}^{k-1} r_{ij} \bq_i, \quad j = k, \ldots, p,
$$
where $\bx_j^{(k)}$ is orthogonal to $\cspace(\bX_{k-1}) = \text{span}\{\bq_1, \ldots, \bq_{k-1}\}$. 
In the $k$-th step we select $s$, so that
\begin{equation}\label{equation:cpqr_max}
\normtwo{\bx_s^{(k)}}^2 = \max_{k \leq j \leq p} \normtwo{\bx_j^{(k)}}^2,
\end{equation}
and interchange columns $k$ and $s$. This is equivalent to choosing at the $k$-th step a pivot column with largest distance to the subspace $\cspace(\bX_{k-1}) = \text{span}\{\bx_{s_1},\bx_{s_2}, \ldots, \bx_{s_{k-1}}\}$, where $\bX_{k-1}$ is the submatrix formed by the columns corresponding to the first $k-1$ selected pivots.

\paragrapharrow{MGS CPQR.}
Building on the recursive MGS algorithm described in Algorithm~\ref{alg:qr-mgs-fulll-rowwise-recursive}, we can also develop a practical implementation of CPQR. This variant is presented in Algorithm~\ref{alg:cpqr-partial-fact-cpqr}. The only difference from the standard MGS lies in the highlighted \textcolor{mylightbluetext}{blue} portion: before each iteration, we permute the column with the largest norm into the leading position.

As a result, we obtain a triangular matrix $\bR$ satisfying
\begin{subequations}
\begin{equation}
r_{11} \geq r_{22} \geq \ldots \geq r_{rr}.
\end{equation}
Since $r_{kk}=\normtwobig{\bx_s^{(k)}}$,  and $\normtwobig{\bx_s^{(k)}}^2 = \max_{k \leq j \leq p} \normtwobig{\bx_j^{(k)}}^2$ by \eqref{equation:cpqr_max}. Therefore, $r_{k,k+1}^2 + r_{k+1,k+1}^2 \leq \normtwobig{\bx_s^{(k)}}^2\equiv r_{kk}^2$.
This argument recursively shows that the diagonal elements in $\bR$ satisfy the stronger inequalities
\begin{equation}\label{equation:cpqr_strong_ineq}
r_{kk}^2 \geq \sum_{i=k}^j r_{ij}^2, \quad j = k+1, \ldots, p, \quad k = 1 : r.
\end{equation}
This implies that if $r_{kk} = 0$, then $r_{ij} = 0$, $i, j \geq k$. In particular,
$$
\abs{r_{11}} = \max_{1 \leq j \leq p} \left\{\abs{\be_j^\top \bR \be_1} \mid \bX \bP_{1j} = \bQ\bR\right\},
$$
where $\bP_{1j}$ is the permutation matrix that interchanges columns 1 and $j$. Then $\normf{\bX}^2 \leq p r_{11}^2$, which yields upper and lower bounds for $\sigma_1(\bX)$,
\begin{equation}\label{equation:cpqr_ineq_condi}
\abs{r_{11}} \leq \sigma_1(\bX) \leq \sqrt{p} \abs{r_{11}}.
\end{equation}
\end{subequations}
If a diagonal element $r_{kk}$ in CPQR vanishes, it follows from \eqref{equation:cpqr_strong_ineq} that $r_{ij} = 0$, $i, j \ge k$. 

Now suppose that at an intermediate stage of CPQR, the new diagonal element satisfies $r_{k+1,k+1} \leq \delta$ for some small $\delta$. Then by \eqref{equation:cpqr_ineq_condi},
$$
\normf{\bX^{(k)}} \leq \sqrt{p-k} \cdot \delta,
$$
and setting $\bX^{(k)} = \bzero$ corresponds to a perturbation $\bE_k$ of $\bX$, such that $\bX + \bE_k$ has rank-$k$ and $\normf{\bE_k} \leq \sqrt{p-k}\cdot \delta$. The matrix
$$
\widehatbX = \bQ_1 \begin{bmatrix}
\bR_{11} & \bR_{12}
\end{bmatrix} \bP^\top, \quad \bQ 
= 
\begin{bmatrix}
\bQ_1 & \bQ_2
\end{bmatrix},
$$
obtained by neglecting $\bR_{22}$, is the best rank-$k$ approximation to $\bX$ that differs from $\bX \bP$ only in the last $p-k$ columns. In particular, when $k = p-1$, we get $\normfbig{\widehatbX - \bX}= r_{pp}$ \citep{bjorck2024numerical}.

A commonly used stopping criterion for CPQR is to terminate the process when $r_{k+1,k+1} \leq \delta$.
However, this may significantly overestimate the numerical rank of $\bX$. It can be shown that  
$$
\sigma_p \ge \frac{3\abs{r_{pp}}}{\sqrt{4^p + 6p - 1}} \ge 2^{1-p} \abs{r_{pp}}.
$$
This inequality demonstrates that $\sigma_p$ can be much smaller than $\abs{r_{pp}}$ for moderately large values of $p$ \citep{faddeev1968solution}.

\begin{algorithm}[htp] 
\caption{\textcolor{mylightbluetext}{Practical} CPQR via MGS (\textcolor{mylightbluetext}{Row-Wise and Recursively}). 
The algorithm is derived from Algorithm~\ref{alg:qr-mgs-fulll-rowwise-recursive} and a similar procedure can be derived based on Algorithm~\ref{alg:mgs_rowwise-recursive_comp} and Algorithm~\ref{alg:mgs_recursive_comp_moreortho}.
}
\label{alg:cpqr-partial-fact-cpqr}
\begin{algorithmic}[1] 
\Require $\bX\in \real^{n\times p}$ with  rank $r$;
\For{$k=1$ to $p$}  \Comment{i.e., compute $k$-th column of $\bQ$ and $k$-th row of $\bR$}
\State \textcolor{mylightbluetext}{Find the column with largest norm in $\bX$, and permute to first column;}
\State $\bx_1\leftarrow\bX[:,1]$; \Comment{$1$-st column of $\bX\in \real^{n\times (p-k+1)}$}
\State $r_{kk}\leftarrow\normtwo{\bx_1}$;\Comment{$\bx_1\in \real^{n\times 1}$}
\State $\bq_k \leftarrow \bx_1/r_{kk}$;
\State $\br_{k2}^\top\leftarrow\bq_k^\top\bX_2$; \Comment{$\bX_2\triangleq\bX[:,2:p]\in \real^{n\times (p-k)}$, $\br_{k2}^\top\in \real^{1\times (p-k)}$}
\State $\bX\leftarrow\bX_2-\bq_k\br_{k2}^\top$; \Comment{$\bX \in \real^{n\times (p-k)}$}
\State \textcolor{mylightbluetext}{Exit when $r_{kk}=0$ or $r_{kk}<\delta$;}
\EndFor 
\State Output permutations, $\bQ=[\bq_1, \ldots, \bq_p]$ and $\bR$ with entry $(i,k)$ being $r_{ik}$.
\end{algorithmic} 
\end{algorithm}

\paragrapharrow{Reduction in computational cost.}
Note that, in each iteration, we need to calculate  the norms of all the (remaining) columns of $\bX$ rather than  computing the norms all at once. At iteration $k$, we need to compute the reduced QR decomposition of a matrix of size $n\times (p-k+1)$ if the original matrix $\bX$ is of size $n\times p$. That is, extra $(p-k+1)(2n-1)$ flops are required to proceed with the CPQR via MGS. Let $f(k)=(p-k+1)(2n-1)$; simple calculation can show that the additional complexity for CPQR via MGS is:
\begin{equation}\label{equation:mgs-cpqr-extra1}
\text{extra cost = }f(1)+f(2)+\ldots +f(p)\sim np^2 \text{  flops},
\end{equation}
if only keep the leading term. 

However, this additional cost in CPQR via MGS can be mitigated to some extent.
Suppose the column partition of $\bX\in \real^{n\times p}$ is $\bX=[\bx_1, \bx_2, \ldots, \bx_p]$, and let the squared norm of each column be given in the vector 
$$
\bl_a = 
\begin{bmatrixfoot}
l_1 \\
l_2 \\
\vdots \\
l_p
\end{bmatrixfoot}=
\begin{bmatrixfoot}
\normtwo{\bx_1}^2\\
\normtwo{\bx_2}^2\\
\vdots \\
\normtwo{\bx_p}^2\\
\end{bmatrixfoot}.
$$
Suppose further that $\bq\in \real^n$ is a unit-length vector such that $\bq^\top\bq=1$, and $\br\in \real^p$ is a vector given by 
$$
\br = \bX^\top\bq
=
\begin{bmatrixfoot}
r_1 \\
r_2 \\
\vdots \\
r_p
\end{bmatrixfoot}
. \gap \text{(similar to the step 6 of  Algorithm~\ref{alg:cpqr-partial-fact-cpqr})}
$$
Let further $\bB=\bX-\bq\br^\top=[\bb_1, \bb_2, \ldots, \bb_p]$ (similar to the step 7 of  Algorithm~\ref{alg:cpqr-partial-fact-cpqr}). 
The vector representing the squared lengths of $\bB$ is given by
$$
\bl_b = 
\begin{bmatrixfoot}
s_1 \\
s_2 \\
\vdots \\
s_p
\end{bmatrixfoot}=
\begin{bmatrixfoot}
\normtwo{\bb_1}^2\\
\normtwo{\bb_2}^2\\
\vdots \\
\normtwo{\bb_p}^2\\
\end{bmatrixfoot}
=
\begin{bmatrixfoot}
l_1 -r_1^2\\
l_2 -r_2^2\\
\vdots \\
l_p-r_p^2
\end{bmatrixfoot}.
$$
This can be easily verified since $\bb_i = \bx_i -r_{i}\bq =\bx_i-(\bx_i^\top\bq) \bq$ such that 
$$
\normtwo{\bb_i}^2 = \normtwo{\bx_i -r_{i}\bq}^2=( \bx_i -r_{i}\bq)^\top( \bx_i -r_{i}\bq)=l_i-r_i^2.
$$

If the column norms $\normtwobig{\bx_j^{(k)}}$ in \eqref{equation:cpqr_max} are recomputed at each stage of MGS, this will increase the operation count of the QR factorization by 50\% \citep{bjorck2024numerical}. Since these quantities are invariant under orthogonal transformations, this overhead can be reduced to $\mathcalO(np)$ operations by using the recursion
$$
\normtwo{\bx_j^{(k+1)}}^2 = \normtwo{\bx_j^{(k)}}^2 - r_{kj}^2, \quad j = k+1, \ldots, p,
$$
to update these values. To avoid numerical problems, $\normtwobig{\bx_j^{(k)}}$ should be recomputed from scratch whenever there has been substantial cancellation, e.g., when $\normtwobig{\bx_j^{(k+1)}} \leq \normtwobig{\bx_j^{(k)}} / \sqrt{2}$.

Coming back to step 2 of Algorithm~\ref{alg:cpqr-partial-fact-cpqr}, suppose we have computed the squared norms of the columns from the original matrix $\bX\in \real^{n\times p}$ (which requires $p(2n-1)$). The squared norms of the columns from $\bX_2-\bq_1\br_{12}^\top$ (suppose $k=1$ in step 7 of Algorithm~\ref{alg:cpqr-partial-fact-cpqr}) can be obtained with an additional $2(p-1)$ flops. 
Over the course of the $p$ iterations, the total cost is $2(p-1)+2(p-2)+\ldots +2(1)=p^2-p$ flops. This is significantly less than the complexity of $\sim np^2$ in Equation~\eqref{equation:mgs-cpqr-extra1}.

\paragrapharrow{CPQR for LS.}

The column pivoting strategy described above is independent of the right-hand side vector $\by$, and therefore may not be the most suitable choice when solving a specific least squares problem. For example, suppose $\by$ is a multiple of one of the columns in $\bX$. Using standard pivoting, this situation might not be recognized until after the full QR factorization has been completed.
An alternative strategy is to select the pivot column at step $k+1$ as the column for which the current residual norm $\normtwobig{\by - \bX\bbeta^{(k)}}$ is maximally reduced. For MGS this is achieved by choosing as pivot the column $\bx_p$ that forms the smallest acute angle with the current residual vector $\be^{(k)} \triangleq \by - \bX\bbeta^{(k)}$. 
Therefore, the column is chosen to maximize
$$
\frac{\big(\bx_j^{(k)}\big)^\top \be^{(k)}}{\normtwobig{\bx_j^{(k)}} \normtwobig{\be^{(k)}}}.
$$
This ensures that each pivot contributes the most to reducing the residual in the current iteration.

\index{Elliptic MGS}
\index{GLS}
\subsection{GLS by Elliptic MGS and Householder Methods}\label{section:gls_ellipmgs}

For a given symmetric positive definite matrix $ \bPhi $,
\begin{equation}
\innerproduct{\bx, \by}_{\bPhi} = \by^\top \bPhi \bx, \qquad \norm{\bx}_{\bPhi} = (\bx^\top \bPhi \bx)^{1/2}
\end{equation}
defines a scalar inner product and the corresponding norm; see Section~\ref{section:generalizedLS}. 
Since the unit ball $\{\bx \mid \norm{\bx}_{\bPhi} \leq 1\}$ forms an ellipsoid, this norm  $\norm{\cdot}_{\bPhi}$ is also referred to as an elliptic norm. 
A generalized Cauchy-Schwarz inequality holds:
\begin{equation}
\abs{\innerproduct{\bx, \by}_{\bPhi}} \leq \norm{\bx}_{\bPhi} \norm{\by}_{\bPhi}.
\end{equation}
Two vectors $ \bx $ and $ \by $ are said to be $ \bPhi $-orthogonal if $\innerproduct{\bx, \by}_{\bPhi} = 0$, and a matrix $ \bQ \in \real^{n\times p} $ is $ \bPhi $-semi-orthogonal if $ \bQ^\top \bPhi \bQ = \bI $.

If $ \bX = [\bx_1,\bx_2, \ldots,\bx_p] \in \real^{n\times p} $ has full column rank, then an elliptic MGS algorithm can be used to compute a $ \bPhi $-semi-orthogonal matrix $ \bQ_1 = [\bq_1,\bq_2, \ldots, \bq_p] $ and an upper triangular matrix $ \bR $ such that
\begin{equation}\label{equation:g_mgs_qr}
\bX = \bQ_1 \bR, \quad \text{with}\quad \bQ_1^\top \bPhi \bQ_1 = \bI_p.
\end{equation}

\subsection*{Elliptic MGS QR Decomposition}
Similar to the elementary projector in Exercise~\ref{exercise:projection_matrix_intro2},
an \textit{elementary elliptic projector} has the form
\begin{equation}\label{equation:elem_ellip_proj}
\bP = (\bI - \bq \bq^\top \bPhi), \quad\text{with } \bq^\top \bPhi \bq = 1
\end{equation}
and satisfies $ \bP^2 = \bI - 2 \bq \bq^\top \bPhi + \bq (\bq^\top \bPhi \bq) \bq^\top \bPhi = \bP $. It is easily verified that for any vector $ \ba  $, $ \bq^\top \bPhi (\bP \ba) = \bzero $, i.e., $ \bP \ba $ is $ \bPhi $-orthogonal to $ \bq $. Note that $ \bP $ is \textbf{not} symmetric and therefore is an oblique projector; see Definition~\ref{definition:orthogonal-projection-matrix} and Section~\ref{section:prop_obli_proj}. Furthermore,
\begin{equation}
\bPhi^{1/2} \bP \bPhi^{-1/2} = \bI - \widetildebq \widetildebq^\top, \quad \text{with } \widetildebq \triangleq  \bPhi^{1/2} \bq
\end{equation}
is an orthogonal projector.

An updated MGS process can compute the factorization in \eqref{equation:g_mgs_qr}; the process is shown in Algorithm~\ref{alg:qr-mgs-right_gen}.

\noindent
\begin{minipage}[t]{0.495\linewidth}
\begin{algorithm}[H] 
\caption{MGS QR (=Algorithm~\ref{alg:qr-mgs-right})}
\label{alg:qr-mgs-right_dup}
\begin{algorithmic}[1] 
\Require $\bX\in \real^{n\times p}$ with full column rank;
\For{$k=1$ to $p$} 
\State $\bx_k^\perp\leftarrow\bx_k$;
\For{$i=1$ to $k-1$}
\State ${r_{ik} \leftarrow\bq_i^\top\textcolor{black}{\bx_k^\perp}}$;
\State $\bx_k^\perp\leftarrow \bx_k^\perp-r_{ik}\bq_i$; 
\EndFor 
\State $\boxed{r_{kk} \leftarrow \normtwo{\bx_k^\perp}}$; 
\State $\bq_k \leftarrow \bx_k^\perp/r_{kk}$; 
\EndFor 
\end{algorithmic} 
\end{algorithm}
\end{minipage}%
\hfil 
\begin{minipage}[t]{0.495\linewidth}
\begin{algorithm}[H] 
\caption{Elliptic MGS QR}
\label{alg:qr-mgs-right_gen}
\begin{algorithmic}[1] 
\Require $\bX\in \real^{n\times p}$ with full column rank;
\For{$k=1$ to $p$} 
\State $\bx_k^\perp\leftarrow\bx_k$;
\For{$i=1$ to $k-1$}
\State ${r_{ik} \leftarrow\bq_i^\top\textcolor{black}{\bx_k^\perp}}$;
\State $\bx_k^\perp\leftarrow \bx_k^\perp-r_{ik}\bq_i$; 
\EndFor 
\State $\boxed{r_{kk} \leftarrow \sqrt{ \innerproduct{\bx_k^\perp, \bPhi\bx_k^\perp}}}$; 
\State $\bq_k \leftarrow \bx_k^\perp/r_{kk}$; 
\EndFor 
\end{algorithmic} 
\end{algorithm}
\end{minipage}

In addition to the $2np^2$ flops for the standard MGS algorithms, elliptic MGS requires $2n^2p$ flops for $p$ matrix-vector products with $\bPhi$. If $n \gg p$, these operations can dominate the  total computational effort. 

However, if a factorization $\bPhi = \bB^\top \bB \in \real^{n \times n}$ is known, then
$$
\norm{\bx}_{\bPhi} = (\bx^\top \bB^\top \bB \bx)^{1/2} = \normtwo{\bB \bx},
$$
and the operations with $\bPhi$ can be replaced by operations with $\bB$ and $\bB^\top$. 
And the factorization in \eqref{equation:g_mgs_qr} can be replaced by the basic QR decomposition of $\bB\bX$
\begin{equation}\label{equation:g_mgs_qr_facbb}
\bB\bX = (\bB\bQ_1) \bR, \quad \text{with}\quad (\bQ_1^\top \bB^\top) (\bB \bQ_1) = \bI_p.
\end{equation}

\paragrapharrow{GLS using elliptic MGS.}
The GLS problem  $\min_{\bbeta}\norm{\by-\bX\bbeta}_{\bPhi}$ with $\bPhi\triangleq \bOmega^{-1}$ in \eqref{equation:gls_prob_loss} can be solved by an elliptic MGS QR decomposition.
If applied to the extended matrix $[\bX,\by]$, this gives the factorization:
\begin{equation}
\begin{bmatrix}
\bX & \by
\end{bmatrix}
=
\begin{bmatrix}
\bQ_1 & \bq_{p+1}
\end{bmatrix}
\begin{bmatrix}
\bR & \bz \\
\bzero & \rho
\end{bmatrix}.
\end{equation}
It follows that $\bX\bbeta - \by = \bQ_1(\bR\bbeta - \bz) - \rho \bq_{p+1}$, where $\bq_{p+1}$ is $\bPhi$-orthogonal to $\bQ_1$. Hence $\norm{\by - \bX\bbeta}_{\bPhi}$ is minimized when $\bR\bbeta = \bz$, and the solution and residual are given by
\begin{equation}
\bR\bbeta = \bz, \qquad \be = \by - \bX\bbeta = \rho \bq_{p+1}.
\end{equation}

\index{Elliptic Householder QR}
\subsection*{Elliptic Householder QR Decomposition}
Similarly, we can obtain an \textit{elliptic Householder QR factorization} \citep{gulliksson1992modifying}. 
We begin by defining the \textit{elliptic Householder reflection} matrix as follows.
\begin{definition}[Elliptic Householder reflector\index{Householder reflector}]\label{definition:ellip_house_reflector}
Let $\bu \in \real^n$ be given and $\bPhi\in\real^{n\times n}$ be positive definite. 
The matrix 
\begin{equation}
\bH = (\bI - \gamma \bu\bu^\top \bPhi) \quad \text{with }\gamma \triangleq 2 / (\bu^\top \bPhi \bu)
\end{equation}
is referred to as a \textit{an elliptic Householder reflector}, a.k.a., an \textit{elliptic Householder transformation}. 
It can be easily verified that
\begin{equation}
\bH^\top \bPhi \bH = (\bI - \gamma \bPhi \bu\bu^\top) \bPhi (\bI - \gamma \bu\bu^\top \bPhi) = \bPhi.
\end{equation}
Such matrices are called \textit{$\bPhi$-invariant}.

\end{definition}

The product of an elliptic Householder reflection $\bH$ with a vector $\ba$ is given by
$$
\bH\ba = (\bI - \gamma \bu\bu^\top \bPhi)\ba = \ba - \gamma (\bu^\top \bPhi \ba) \bu. 
$$
Similar to a basic Householder reflector (Definition~\ref{definition:householder-reflector}), it can be  verified that $\bH$ is also orthogonal such that that $\bH^2 = \bI$ and  $\bH^{-1} = \bH$. However, $\bH$ is neither symmetric nor $\bPhi$-orthogonal. 
On the other hand, the transformation 
$$
\bPhi^{1/2} \bH \bPhi^{-1/2} = \bI - \gamma \widetildebu \widetildebu^\top, \quad\text{with } \widetildebu = \bPhi^{1/2} \bu 
\ \text{ and } \ \bPhi^{1/2} \bPhi^{1/2} =\bPhi
$$
yields an orthogonal reflection. 

It can be verified that the unit matrix $\bI$ is $\bPhi$-invariant, and a product of $\bPhi$-invariant matrices $\bH = \bH_1 \bH_2 \ldots \bH_p$ is again $\bPhi$-invariant. This property characterizes transformations that make the $\bPhi$-norm invariant:
\begin{align}
&\normtwo{\bQ\bbeta} = \normtwo{\bbeta}, \quad \text{if $\bQ$ is orthogonal, i.e., $\bQ$ is $\bI$-invariant}\\
&\implies 
\norm{\bH \bbeta}_{\bPhi}^2 = (\bH \bbeta)^\top \bPhi \bH \bbeta = \bbeta^\top \bPhi \bbeta = \norm{\bbeta}_{\bPhi}^2, \quad \text{if $\bH$ is $\bPhi$-invariant}. \label{equation:phi_norm_invar}
\end{align}
Therefore, $\min_{\bbeta} \norm{\bX\bbeta - \by}_{\bPhi}$ and $\min_{\bbeta} \norm{\bH(\bX\bbeta - \by)}_{\bPhi}$ have the same solution.
Using these insights, to develop a Householder QR algorithm for solving $\min_{\bbeta} \norm{\bX\bbeta - \by}_{\bPhi}$, we construct a sequence of elliptic Householder reflectors $\bH_i$ such that
\begin{equation}
\bH_p \ldots\bH_2 \bH_1 (\bX\bbeta - \by) =
\begin{bmatrix}
\bR_1  \\
\bzero 
\end{bmatrix}
\bbeta -
\begin{bmatrix}
\bc_1 \\
\bc_2
\end{bmatrix},
\end{equation}
where $\bR_1$ is upper triangular and nonsingular. 
Therefore, an equivalent problem of the GLS problem $\min_{\bbeta} \norm{\bX\bbeta - \by}_{\bPhi}$ is $\min_{\bbeta} \norm{\bR_1\bbeta - \bc_1}_{\bPhi}$ with solution $\bbeta = \bR_1^{-1} \bc_1$. As in the standard Householder method, this only requires that we construct an elliptic Householder reflector $\bH$ that maps a given vector $\bx$ onto a multiple of the unit vector $\be_1$:
\begin{equation}\label{equation:ellip_hou_qr}
\bH\bx = \bx - \gamma (\bu^\top \bPhi \bx) \bu = \pm \sigma \be_1, \quad \text{with }\sigma \triangleq \frac{\norm{\bx}_{\bPhi}}{\norm{\be_1}_{\bPhi}}.
\end{equation}
By the invariance of the $\bPhi$-norm \eqref{equation:phi_norm_invar},
$ \sigma \norm{\be_1}_{\bPhi} = \norm{\bx}_{\bPhi}, \ \norm{\be_1}_{\bPhi} = (\be_1^\top \bPhi \be_1)^{1/2}, $
and from \eqref{equation:ellip_hou_qr}, we have $\bu = \bx \pm \sigma \be_1$. Hence $\gamma = 2 / (\bu^\top \bPhi \bu)$, where
$ \bu^\top \bPhi \bu = (\bx \pm \sigma \be_1)^\top \bPhi (\bx \pm \sigma \be_1) = 2 (\norm{\bx}_{\bPhi}^2 \pm \sigma \bx^\top \bPhi \be_1). $
For stability, the sign should be chosen to maximize $\bu^\top \bPhi \bu$.

\section{LS via UTV Decomposition for Rank-Deficient Matrix}\label{section:utv_ls}

The CPQR decomposition of a rank-deficient matrix $\bX \in \real^{n\times p}$ with $\rank(\bX)=r$ is 
$
\bX\bP = 
[\bQ_1,\bQ_2]
\begin{bmatrixfoot}
\bR_{11} & \bR_{12} \\
\bzero   & \bzero 
\end{bmatrixfoot}
$, where $\bR_{11}\in\real^{r\times r}$ $r < p$, is nonsingular. 
Here, $\bQ_1$ and $\bQ_2$ provide orthogonormal bases for $\cspace(\bX)$ and $\nspace(\bX^\top)$, respectively. 
However, this factorization is not as useful in applications that require a basis for the nullspace $\nspace(\bX)$.
A related decomposition, known as the \textit{complete orthogonal decomposition},  expresses a matrix using two orthogonal matrices. It is closely related to the CPQR decomposition.
\begin{theoremHigh}[Complete orthogonal decomposition]\label{theorem:complete-orthogonal-decom}
Let $\bX\in\real^{n\times p}$ be given  with rank $r$. Then it  can be factored as 
$$
\bX = \bU \begin{bmatrix}
\bT & \bzero \\
\bzero & \bzero 
\end{bmatrix}\bV,
$$
where $\bU\in \real^{n\times n}$ and $\bV\in \real^{p\times p}$ are two orthogonal matrices, and $\bT\in \real^{r\times r}$ is a rank-$r$ matrix.
\end{theoremHigh}
\begin{proof}[of Theorem~\ref{theorem:complete-orthogonal-decom}]
By utilizing the column-pivoted QR decomposition (Theorem~\ref{theorem:rank-revealing-qr-general}), $\bX$ can be factored as 
$
\bQ_1^\top \bX\bP = 
\begin{bmatrixfoot}
\bR_{11} & \bR_{12} \\
\bzero   & \bzero 
\end{bmatrixfoot},
$
where $\bR_{11} \in \real^{r\times r}$ is upper triangular, $\bR_{12} \in \real^{r\times (p-r)}$, $\bQ_1\in \real^{n\times n}$ is an orthogonal matrix, and $\bP$ is a permutation matrix. 
Then it is not difficult to find a decomposition satisfying
\begin{equation}\label{equation:orthogonal-complete-qr-or-not}
\begin{bmatrix}
\bR_{11}^\top \\
\bR_{12}^\top
\end{bmatrix}
\triangleq
\bQ_2
\begin{bmatrix}
\bS \\
\bzero 
\end{bmatrix},
\end{equation}
where $\bQ_2$ is an orthogonal matrix, and $\bS$ is a rank-$r$ matrix. The decomposition is reasonable because  the matrix
$\footnotesize
\begin{bmatrix}
\bR_{11}^\top \\
\bR_{12}^\top
\end{bmatrix} \in \real^{p\times r}$ has rank $r$ whose columns stay in a subspace of $\real^p$. Nevertheless, the columns of $\bQ_2$ span the entire space of $\real^p$, where we can assume that the first $r$ columns of $\bQ_2$ span the same space as that of  $\footnotesize\begin{bmatrix}
\bR_{11}^\top \\
\bR_{12}^\top
\end{bmatrix}$. The matrix $\footnotesize\begin{bmatrix}
\bS \\
\bzero 
\end{bmatrix}$ serves to transform $\bQ_2$ into $\footnotesize\begin{bmatrix}
\bR_{11}^\top \\
\bR_{12}^\top
\end{bmatrix}$.
Then,  it follows that 
$
\bQ_1^\top \bX\bP \bQ_2 = 
\begin{bmatrixfoot}
\bS^\top & \bzero \\
\bzero & \bzero 
\end{bmatrixfoot}.
$
Let $\bU \triangleq\bQ_1$, $\bV\triangleq\bQ_2^\top\bP^\top$, and $\bT \triangleq \bS^\top$, we complete the proof.
\end{proof}
We observe that if we consider Equation~\eqref{equation:orthogonal-complete-qr-or-not} as the reduced QR decomposition of $
\footnotesize 
\begin{bmatrix}
\bR_{11}^\top \\
\bR_{12}^\top
\end{bmatrix}$, then the complete orthogonal decomposition reduces to the ULV decomposition; see the next paragraph.

\subsection*{UTV Decomposition}\label{section:utv_decomp}
The \textit{UTV decomposition} goes further from QR and LQ decomposition by factoring the matrix into two orthogonal matrices $\bX=\bU\bT\bV$, where $\bU$ and $\bV$ are orthogonal, whilst $\bT$ is (upper/lower) triangular.
The resulting $\bT$ supports rank estimation. The matrix $\bT$ can be lower triangular which results in the ULV decomposition, or it can be upper triangular which results in the URV decomposition. The UTV framework shares a similar form as the singular value decomposition (SVD, see Theorem~\ref{theorem:reduced_svd_rectangular}) and can be regarded as inexpensive alternative to the SVD.

\begin{theoremHigh}[Full ULV decomposition \citep{hanson1969extensions}]\label{theorem:ulv-decomposition}
Every $n\times p$ matrix $\bX$ with rank $r$ admits the following factorization:
$$
\bX = \bU \begin{bmatrix}
\bL & \bzero \\
\bzero & \bzero 
\end{bmatrix}\bV^\top,
$$
where $\bU\in \real^{n\times n}$ and $\bV\in \real^{p\times p}$ are two orthogonal matrices, and $\bL\in \real^{r\times r}$ is a lower triangular matrix with full rank.
\end{theoremHigh}
The existence of the ULV decomposition is the consequence of the QR and LQ decomposition.
\begin{proof}[of Theorem~\ref{theorem:ulv-decomposition}]
For any rank-$r$ matrix $\bX=[\bx_1, \bx_2, \ldots, \bx_p]$, we can use a column permutation matrix $\bP$ (Definition~\ref{definition:permutation-matrix}) such that the linearly independent columns of $\bX$ appear in the first $r$ columns of $\bX\bP$. Without loss of generality, we assume $\bb_1, \bb_2, \ldots, \bb_r$ are the $r$ linearly independent columns of $\bX$ and 
$$
\bX\bP = [\bb_1, \bb_2, \ldots, \bb_r, \bb_{r+1}, \ldots, \bb_p].
$$
Let $\bZ = [\bb_1, \bb_2, \ldots, \bb_r] \in \real^{n\times r}$. Since any $\bb_i$ ($i\in \{1,2,\ldots, p\}$) is in the column space of $\bZ$, we can find a transformation matrix $\bE\in \real^{r\times (p-r)}$ such that 
$$
[\bb_{r+1}, \bb_{r+2}, \ldots, \bb_p] = \bZ \bE.
$$
That is, 
$$
\bX\bP = [\bb_1, \bb_2, \ldots, \bb_r, \bb_{r+1}, \ldots, \bb_p] = \bZ
[\bI_r , \bE],
$$
where $\bI_r$ is an $r\times r$ identity matrix. Moreover, $\bZ\in \real^{n\times r}$ has full column rank such that its full QR decomposition is given by $\bZ = \bU\begin{bmatrixfoot}
\bR \\
\bzero
\end{bmatrixfoot}$, where $\bR\in \real^{r\times r}$ is an upper triangular matrix with full rank, and $\bU$ is an orthogonal matrix. This implies 
\begin{equation}\label{equation:ulv-smpl}
\bX\bP = \bZ
\begin{bmatrix}
\bI_r & \bE
\end{bmatrix}
=
\bU\begin{bmatrix}
\bR \\
\bzero
\end{bmatrix}
\begin{bmatrix}
\bI_r & \bE
\end{bmatrix}
=
\bU\begin{bmatrix}
\bR & \bR\bE \\
\bzero & \bzero 
\end{bmatrix}.
\end{equation}
Since $\bR$ has full rank, this means 
$[\bR, \bR\bE ]$ also has full rank such that its full LQ decomposition is given by 
$[\bL, \bzero ]\bV_0$, where $\bL\in \real^{r\times r}$ is a lower triangular matrix, and $\bV_0$ is an orthogonal matrix; see Theorem~\ref{theorem:lq-decomposition}. 
Substituting this into Equation~\eqref{equation:ulv-smpl}, we have 
$
\bX = \bU 
\begin{bmatrixfoot}
\bL & \bzero \\
\bzero & \bzero 
\end{bmatrixfoot}
\bV_0 \bP^{-1}.
$
Let $\bV^\top \triangleq \bV_0 \bP^{-1}$, which is a product of two orthogonal matrices and is also an orthogonal matrix. This completes the proof.
\end{proof}
A second way to see the proof of the ULV decomposition is discussed in \citet{lu2021numerical} via the rank-revealing QR (RRQR) decomposition and standard QR decomposition. However, we will not go into further detail here.

\paragrapharrow{Reduced ULV decomposition.} Now suppose the ULV decomposition of matrix $\bX$ is 
$
\bX = \bU 
\footnotesize
\begin{bmatrix}
\bL & \bzero \\
\bzero & \bzero 
\end{bmatrix}
\normalsize
\bV^\top.
$
Let $\bU_1 \triangleq \bU_{:,1:r}$ and $\bV_1 \triangleq \bV_{:, 1:r}$, i.e., $\bU_1$ contains only the first $r$ columns of $\bU$, and $\bV_1$ contains only the first $r$ columns of $\bV$. Then, we still have $\bX = \bU_1 \bL\bV_1^\top$. This is known as the \textit{reduced ULV decomposition}.
The comparison between the reduced and the full ULV decomposition is shown in Figure~\ref{fig:ulv-comparison}, where white entries are zero, and blues entries are not necessarily zero.
\begin{figure}[h]
\centering   
\vspace{-0.35cm}  
\subfigtopskip=2pt  
\subfigbottomskip=2pt 
\subfigcapskip=-5pt 
\subfigure[Reduced ULV decomposition.]{\label{fig:ulvhalf}
\includegraphics[width=0.47\linewidth]{./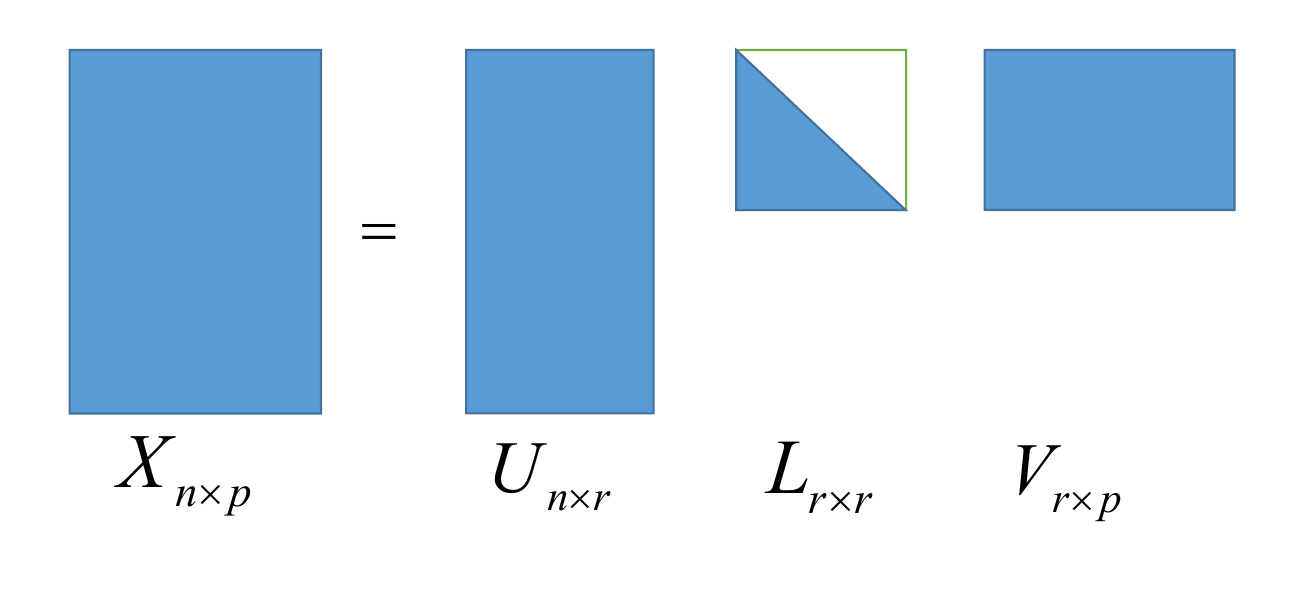}}
\quad 
\subfigure[Full ULV decomposition.]{\label{fig:ulvall}
\includegraphics[width=0.47\linewidth]{./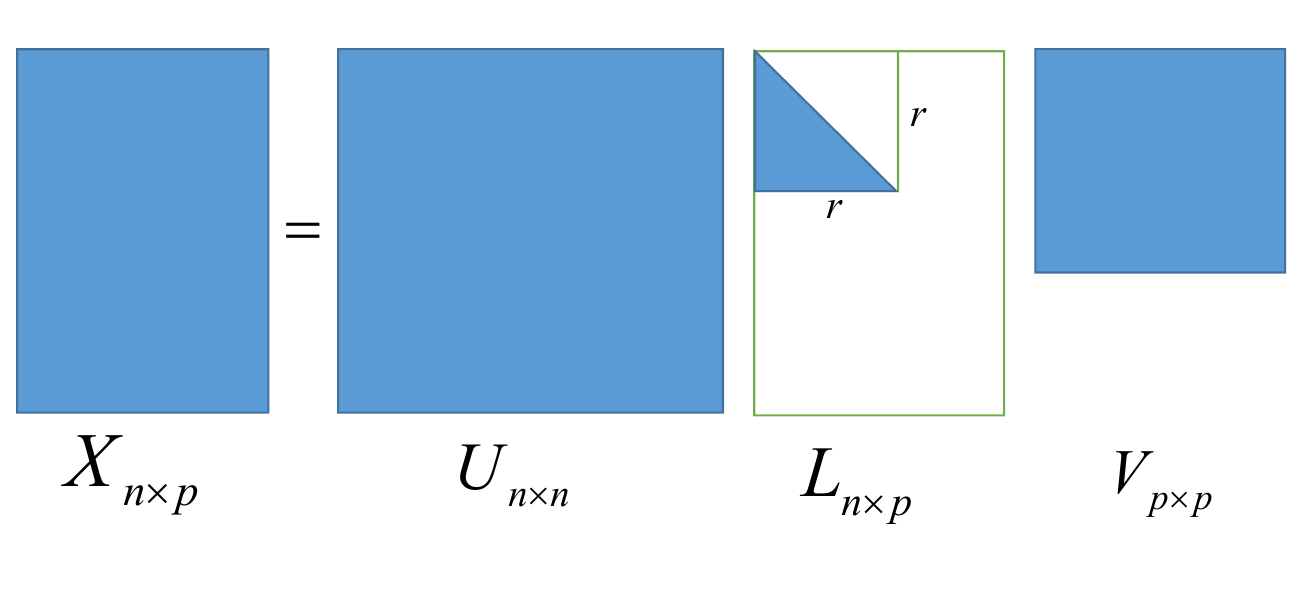}}
\quad
\subfigure[Reduced URV decomposition.]{\label{fig:urvhalf}
\includegraphics[width=0.47\linewidth]{./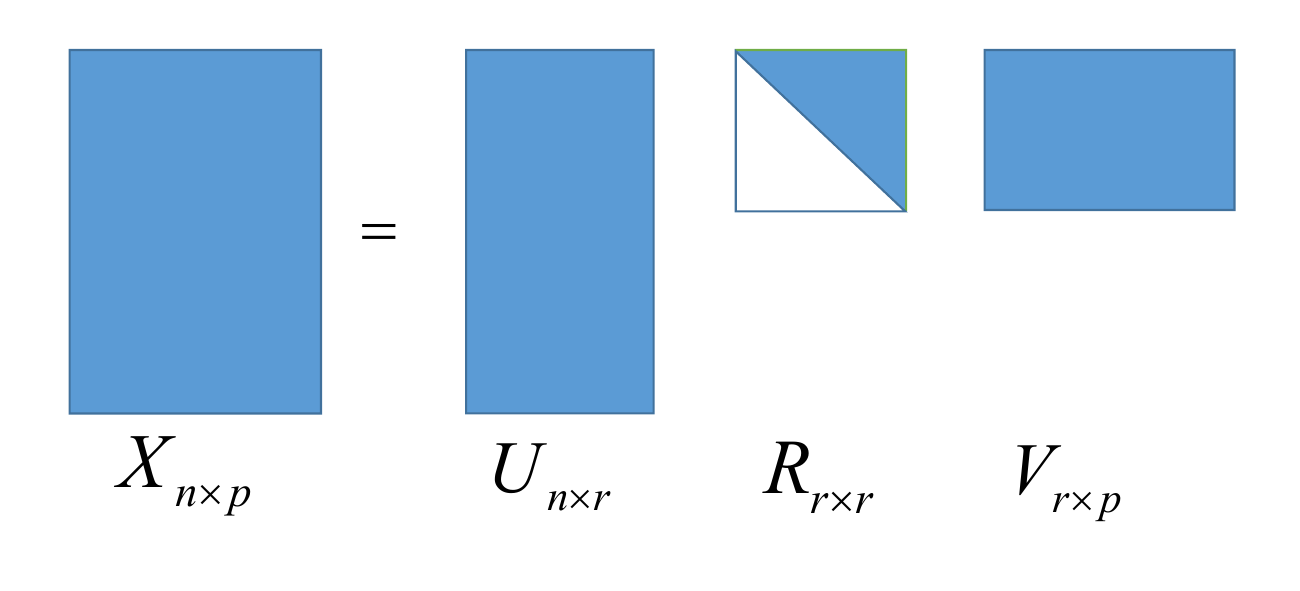}}
\quad 
\subfigure[Full URV decomposition.]{\label{fig:urvall}
\includegraphics[width=0.47\linewidth]{./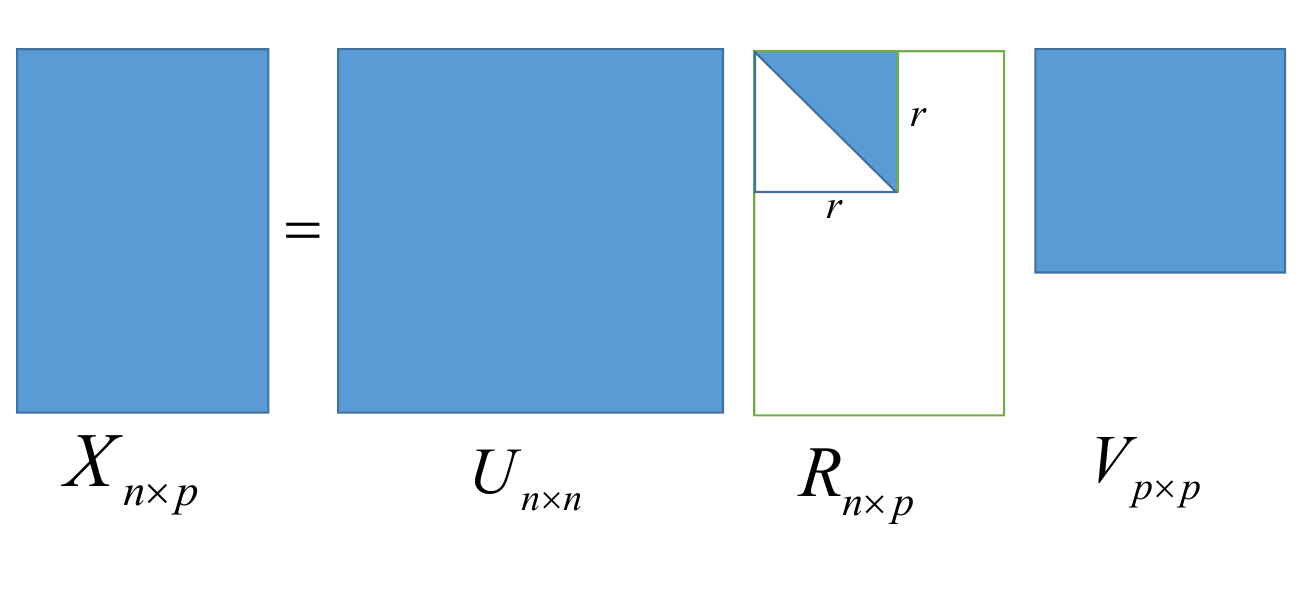}}
\caption{Comparison between the reduced and full ULV, and between the reduced and full URV.}
\label{fig:ulv-comparison}
\end{figure}

\index{Full and reduced}
Similarly, we can also claim the URV decomposition as follows.

\begin{theoremHigh}[Full URV decomposition]\label{theorem:urv-decomposition}
Every $n\times p$ matrix $\bX$ with rank $r$ admits the following decomposition:
$$
\bX = \bU \begin{bmatrix}
\bR & \bzero \\
\bzero & \bzero 
\end{bmatrix}\bV^\top,
$$
where $\bU\in \real^{n\times n}$ and $\bV\in \real^{p\times p}$ are two orthogonal matrices, and $\bR\in \real^{r\times r}$ is an upper triangular matrix with full rank.
\end{theoremHigh}

Again, there is a version of reduced URV decomposition and the difference between the full and reduced URV can be implied from the context, as shown in Figure~\ref{fig:ulv-comparison}. The ULV and URV sometimes are collectively referred to as the UTV decomposition framework \citep{fierro1997low, golub2013matrix}.

We observe that the forms of ULV and URV are very close to the singular value decomposition (SVD). All of the three factor the matrix $\bX$ into two orthogonal matrices. Especially, there exists a set of basis for the four subspaces of $\bX$ in the fundamental theorem of linear algebra via the ULV and the URV. Taking ULV as an example, the first $r$ columns of $\bU$ form an orthonormal basis of $\cspace(\bX)$, and the last $(n-r)$ columns of $\bU$ form an orthonormal basis of $\nspace(\bX^\top)$. The first $r$ columns of $\bV$ form an orthonormal basis for the row space $\cspace(\bX^\top)$, and the last $(p-r)$ columns form an orthonormal basis for $\nspace(\bX)$:
$$
\begin{aligned}
\cspace(\bX) &= \spn\{\bu_1, \bu_2, \ldots, \bu_r\}, &\quad& \nspace(\bX^\top) = \spn\{\bu_{r+1}, \bu_{r+2}, \ldots, \bu_n\},\\
\cspace(\bX^\top) &= \spn\{ \bv_1, \bv_2, \ldots,\bv_r \}, &\quad& \nspace(\bX) = \spn\{\bv_{r+1}, \bv_{r+2}, \ldots, \bv_p\}.
\end{aligned}
$$
The SVD goes further that there is a connection between the two pairs of orthonormal basis, i.e., transforming from column basis into row basis, or left null space basis into right null space basis; see Theorem~\ref{theorem:svd-four-orthonormal-Basis}.
\index{Fundamental spaces}

\paragrapharrow{Computation of URV.}
The CPQR decomposition of a rank-deficient matrix $\bX \in \real^{n\times p}$ with $\rank(\bX)=r$ takes the form 
$
\bX\bP = 
[\bQ_1,\bQ_2]=\bQ\bR\triangleq
\begin{bmatrixfoot}
\bR_{11} & \bR_{12} \\
\bzero   & \bzero 
\end{bmatrixfoot}
$, where $\bR_{11}\in\real^{r\times r}$, with $r < p$, is nonsingular. 
Here, $\bQ_1$ and $\bQ_2$ provide orthogonormal bases for $\cspace(\bX)$ and $\nspace(\bX^\top)$, respectively.  
The elements in $\bR_{12}$ can be annihilated by postmultiplying $\bR$ by a sequence of Householder reflectors
$$
[\bR_{11}, \bR_{12}] \bH_r \ldots \bH_2\bH_1 = \begin{bmatrix}
\widetildebR & \bzero \\
\bzero & \bzero
\end{bmatrix}, \quad \bH_j = \bI - 2 \bu_j \bu_j^\top, \quad j = r, r-1, \ldots, 1,
$$
where $\bu_j$ has nonzero elements only in positions $j, r+1, \ldots, p$. 
The process is equivalent to a QL factorization of the transpose of the triangular factor $\bR$,
$$
\begin{bmatrix}
\bR_{11}^\top & \bzero \\
\bR_{12}^\top & \bzero
\end{bmatrix} = 
\widetildebQ 
\begin{bmatrix}
\widetildebR^\top & \bzero \\
\bzero & \bzero
\end{bmatrix},
$$
and requires \textcolor{black}{$2r^2(p-r)$ flops. }
This obtains the URV decomposition
$$
\bX  = \bQ 
\begin{bmatrix}
\widetildebR & \bzero \\
\bzero & \bzero
\end{bmatrix}\bV^\top\bP^\top, \quad \bV \triangleq \bH_1 \ldots \bH_r.
$$

\index{Rank-deficient}
\index{Rank-deficiency}
\subsection*{LS by UTV}
In Section~\ref{section:application-ls-qr}, we introduced the LS solution using the full QR decomposition for  matrices of full rank.
However, it often happens that the matrix may be rank-deficient. If $\bX$ does not have full column rank, $\bX^\top\bX$ is not invertible. 
Instead of using RRQR decomposition, we can also use the ULV/URV decomposition to compute the least squares solution, as illustrated in the following theorem.\index{Least squares}

\index{Minimum-norm solution}
\begin{theoremHigh}[LS via ULV/URV for rank-deficient matrix]\label{theorem:qr-for-ls-urv}
Let $\bX\in \real^{n\times p}$ with rank $r$ and $n\geq p$. Suppose $\bX=\bU\bT\bV^\top$ is its full ULV/URV decomposition with $\bU\in\real^{n\times n}$ and $\bV^\top\in \real^{p\times p}$ being orthogonal matrix matrices, and
$
\bT \triangleq \begin{bmatrixfoot}
\bT_{1} & \bzero \\
\bzero & \bzero
\end{bmatrixfoot},
$
where $\bT_{1} \in \real^{r\times r}$ is a lower triangular matrix or an upper triangular matrix.
Suppose $\by\in \real^n$ is the response vector.
Then the OLS solution with minimum $\ell_2$ norm to $\bX\bbeta=\by$ is given by 
$$
\widehatbbeta = \bV 
\begin{bmatrix}
\bT_{1}^{-1}\bc_1\\
\bzero	
\end{bmatrix},
$$
where $\bc_1$ contains the first $r$ components of $\bU^\top\by$. 
\end{theoremHigh}
The proof follows immediately since $
\bX^+ = \bV
\begin{bmatrixfoot}
\bT_{1}^{-1} & \bzero \\
\bzero & \bzero
\end{bmatrixfoot} \bU^\top,
$ and $\widehatbbeta=\bX^+\by$ by Theorem~\ref{theorem:unif_ls}.
Alternatively, a more straightforward way to show the minimum-norm solution is provided below.
\begin{proof}[of Theorem~\ref{theorem:qr-for-ls-urv}]
Since $\bX=\bU\bT\bV^\top $ is the full UTV decomposition of $\bX$ and $n\geq p$,
it follows that
$$
\begin{aligned}
\normtwo{\by-\bX\bbeta}^2 
&\stackrel{\dag}{=}\normtwo{\bU^\top \bX \bbeta-\bU^\top\by}^2
=\normtwo{ \bU^\top\bU\bT\bV^\top  \bbeta-\bU^\top\by}^2\\
&=\normtwo{ \bT\bV^\top  \bbeta-\bU^\top\by}^2 
=\normtwo{\bT_{1}\be_1 - \bc_1}^2+\normtwo{\bc_2}^2,   
\end{aligned}
$$ 
where the equality ($\dag$) follows from the invariance under orthogonal transformation, $\bc_1$ is the first $r$ components of $\bU^\top\by$, and $\bc_2$ is the last $p-r$ components of $\bU^\top\by$; $\be_1$ is the first $r$ components of $\bV^\top \bbeta$, and $\be_2$ is the last $p-r$ components of $\bV^\top \bbeta$:
$$
\bU^\top\by 
\triangleq \begin{bmatrix}
\bc_1 \\
\bc_2 
\end{bmatrix}
\qquad \text{and}\qquad 
\bV^\top \bbeta 
\triangleq \begin{bmatrix}
\be_1 \\
\be_2
\end{bmatrix}.
$$
And the LS solution can be calculated by back/forward substitution of the upper/lower triangular system $\bT_{1}\be_1 = \bc_1$, i.e., $\be_1 = \bT_{1}^{-1}\bc_1$. For $\bbeta$ to have the minimum $\ell_2$ norm, $\be_2$ must be zero. That is,
$
\widehatbbeta = \bV
\begin{bmatrixfoot}
\bT_{1}^{-1}\bc_1\\
\bzero	
\end{bmatrixfoot}.
$
This completes the proof.
\end{proof}


\begin{problemset}
\item Find the gradient descent and greedy descent update rules for the data least squares problem introduced in \eqref{equation:dls}.

\item Prove that the $\bQ$-norm introduced in \eqref{equation:q_norm} is a valid vector norm satisfying Definition~\ref{definition:matrix-norm}.

\item  \label{prob:qnorm_des} Prove that the update step from the $\bQ$-norm in \eqref{equation:qnorm_ugd} is a descent direction satisfying $\innerproduct{\bd_{\text{ugd}}^\toptzero, \nabla f(\btheta^\toptzero)}<0$. 

\item Use the  ``MovieLens 100K" data from MovieLens \citep{harper2015movielens}~\footnote{http://grouplens.org} to evaluate your ALS algorithms. 

\item \label{problem:cond_pd} Given the Cholesky decomposition of a PD matrix: $\bA=\bL\bD\bL^\top$, show that $\cond(\bA)\geq \cond(\bD)$.

\item Prove the relation stated in  \eqref{equation:qr-orthogonal-equality}.

\item Discuss how to solve the restricted least squares (RLS) problem introduced in Problem~\ref{prob:rls} using either the Cholesky or QR decomposition. 

\item Following the proof of Theorem~\ref{theorem:ulv-decomposition}, prove the existence of the URV decomposition in Theorem~\ref{theorem:urv-decomposition}.

\item \textbf{Indefinite least squares (ILS) \citep{chandrasekaran1998stable}.} For a matrix $ \bX \in \real^{n \times p} $ with $ n \geq p $, and $ \by \in \real^n $, the \textit{indefinite least squares (ILS)} problem is
\begin{equation}\label{equation:ils}
\min_{\bbeta} (\by - \bX\bbeta)^\top \bG (\by - \bX\bbeta), \quad \bX 
= \begin{bmatrix} 
\bX_1 \\ 
\bX_2 
\end{bmatrix}, \quad 
\by = 
\begin{bmatrix} 
\by_1 \\ 
\by_2 
\end{bmatrix},
\end{equation}
where $ \bX_1 \in \real^{n_1 \times p} $, $ \bX_2 \in \real^{n_2 \times p} $, $ n_1 + n_2 = n $, and
$
\bG = \begin{bmatrixfoot} \bI_{n_1} & \bzero \\ \bzero & -\bI_{n_2} \end{bmatrixfoot}
$
is the \textit{signature matrix}. Note that $ \bG^{-1} = \bG $.  
A necessary condition for $ \bbeta $ to be a solution of \eqref{equation:ils} is that the gradient of the objective function vanishes:
$
\bX^\top \bG (\by - \bX\bbeta) = \bzero.
$
This implies that the residual vector $ \be = \by - \bX\bbeta $ is $ \bG $-orthogonal to $ \cspace(\bX) $ (defined in \eqref{equation:phi_norm_invar}). Equivalently, $ \bbeta $ solves the normal equation $ \bX^\top \bG \bX \bbeta = \bX^\top \bG \by $. 
\begin{itemize}
\item Discuss under what conditions the ILS problem has a unique solution.
\item Compute the reduced QR factorization
$$
\bX = \begin{bmatrix} \bX_1 \\ \bX_2 \end{bmatrix} = \begin{bmatrix} \bQ_1 \\ \bQ_2 \end{bmatrix} \bR = \bQ\bR, \quad \bQ_1 \in \real^{n_1 \times p}, \quad \bQ_2 \in \real^{n_2 \times p},
$$
where $\bQ^\top \bQ = \bQ_1^\top \bQ_1 + \bQ_2^\top \bQ_2 = \bI_p$.
And let $
\bQ_1^\top \bQ_1 - \bQ_2^\top \bQ_2 = \bL \bL^\top
$ be the Cholesky decomposition. Discuss how to solve the ILS problem using these factors. 
\end{itemize}

\end{problemset}
\newpage
\chapter{Noise Disturbance and Parameter Estimation}\label{sec:lr-gaussian-noise}
\begingroup
\hypersetup{
linkcolor=structurecolor,
linktoc=page,  
}
\minitoc \newpage
\endgroup

\index{Random noise}
\index{Gaussian noise}
\section{From Random Noise to Gaussian Noise}
\lettrine{\color{caligraphcolor}I}
In Section~\ref{section:ls_approx1}, we examined the overdetermined system $\by = \bX\bbeta $, where $\bX\in \real^{n\times p}$ denotes a input data matrix of full column rank, $\by\in \real^n$ represents the response vector, and the sample number $n$ exceeds the number of features $p$ such that the columns of $\bX$ are linearly independent. 
The vector $\bbeta$ contains the coefficients (or weights) of the linear model that relates the inputs to the outputs.

In Section~\ref{sec:geometry-noise-disturbance}, we extend this framework by assuming that the observed output $\by$ arises from an ideal function $g(\bX)  $ lying in the column space of $\bX$: $g(\bX)\in \cspace(\bX)$.
Specifically, we model the randomness in observations through the equation:
\begin{equation}
\rvy =g(\bX)+  \bepsilon,
\end{equation}
where $\bepsilon$ represents a noise term. 
This implies that the actual observations $\by$ deviate from the true signal $g(\bX)=\bX\bbeta$ due to additive noise, resulting in the random variable $\rvy$~\footnote{Note again that we use normal fonts of boldface lowercase letters to denote random vectors, and  normal fonts of boldface uppercase letters to denote random matrices. That is, $\rx, \rva, \rmX$ are random scalars, vectors, or matrices; while $x, \ba, \bX$ are scalars, vectors, or matrices. In many cases, the two terms can be used interchangeably; that is, $\rx=x$ denotes a realization of the variable.}.
This situation is illustrated in Figure~\ref{fig:ls-geometric2}, which offers a visual interpretation of how noise affects the relationship between the ``true model" and the observed data.
This visual representation provides a comprehensive overview of the problem.

Furthermore, we assume that the noise components $\epsilon_i,\ i\in\{1,2,\ldots,n\}$ are independently and identically distributed (i.i.d.) according to a Gaussian distribution with zero mean and constant variance $\sigma^2$. For each observation $i\in\{1,2,\ldots,n\}$, this leads to the following probabilistic model:
$$
\ry_i = \beta_0 +\beta_1x_{i1} + \beta_2 x_{i2} +\ldots +\beta_{p-1} x_{i,p-1} +\epsilon_i,
$$
where $\beta_0$ serves as the intercept or bias term.

Under the assumption of Gaussian noise, the likelihood function---representing the probability of observing the data given the model parameters---can be derived. This forms the basis for the \textit{maximum likelihood estimator (MLE)}. 
In particular, when the noise follows a normal distribution, the model corresponds to what is commonly known as the \textit{Gauss-Markov linear model}, also referred to as \textit{standard linear regression} or the \textit{Gaussian linear model}.


More concretely, the likelihood function under Gaussian noise is constructed as the product of individual normal density functions. The likelihood of the observed data $\by$ is therefore expressed as:
\begin{equation}\label{equation:likelihood-of-gaussiannoise}
\begin{aligned}
\mathrm{Likelihood}=\mathcalL(\bbeta)
&= p(\by\mid \bX,\bbeta) 
= \prod_{i=1}^n p(y_i\mid \bx_i, \bbeta)\\
&= \prod_{i=1}^n \frac{1}{\sqrt{2\pi\sigma^2}} \exp\left\{-\frac{1}{2\sigma^2}(y_i-\bx_i^\top\bbeta)^2\right\}\\
&= \frac{1}{(2\pi\sigma^2)^{n/2}} \exp\left\{-\frac{1}{2\sigma^2} (\by-\bX\bbeta)^\top(\by-\bX\bbeta)\right\},
\end{aligned}	
\end{equation}
\footnote{For convenience, we slightly abuse the notation by letting $\bx_i$ denote the $i$-th row of the matrix $\bX$. 
In general, however, we use $\bx^{(i)}$ to represent rows of $\bX$ throughout this book.}
which follows a multivariate Gaussian distribution $\normal(\bX\bbeta, \sigma^2\bI)$ (Definition~\ref{definition:multivariate_gaussian}), and
quantifies the plausibility of observing the dataset $\by$ given the model parameters $\bbeta$, input $\bX$, and the assumed noise characteristics.
For computational convenience, it is standard practice to work with the \textit{log-likelihood function}, obtained by taking the natural logarithm of the likelihood expression:
\begin{equation}\label{equation:loglikelihood-of-gaussiannoise}
\begin{aligned}
\text{Log-likelihood}=\ell(\bbeta) = \ln \mathcalL(\bbeta)&= \ln p(\by\mid \bX,\bbeta). \\
\end{aligned}	
\end{equation}

\section{Parameter Estimation}

At the beginning of this chapter, we introduced Gaussian noise into the linear model. As a result, we derived the likelihood function and the log-likelihood function, given in \eqref{equation:likelihood-of-gaussiannoise} and \eqref{equation:loglikelihood-of-gaussiannoise}, respectively. 
We have reviewed the concepts of random variables and probability distributions. In some cases, we know that a random vector $\rvx$ follows a particular probability distribution $ p(\bx \mid \btheta) $, but the parameter $\btheta$ of this distribution is unknown.
(e.g., $\bbeta$ in the least squares model \eqref{equation:likelihood-of-gaussiannoise}). 
Similarly, in a Gaussian distribution, the mean $ \mu $, the variance $ \sigma^2 $, or both may be unknown. 
However, if we can obtain observed data from such a random variable, we can use that data to estimate the unknown parameters of the model. 
This observed data is typically referred to as ``\textit{training data}". 
\textit{Model estimation} or \textit{parameter estimation} are methods  of determining the values of parameters within a predefined model structure based on observed data~\footnote{Note that, strictly speaking, \textit{model estimation} refers to the process of determining the values of parameters within a predefined model structure based on observed data; \textit{model fitting} is the process of adjusting a model to match the available data as closely as possible; \textit{model learning} is a broader concept typically used in machine learning, which refers to the entire process of automatically discovering patterns from data and building a predictive or descriptive model.
In this book, we use these terms interchangeably; thus, an estimated mode, a fitted model, and a learned model all refer to the same concept.
}.
In this section, we discuss how to estimate unknown model parameters using observed samples drawn from a given probability distribution.

\subsection{Maximum Likelihood Estimation (MLE)}\label{section:mle_method}

Consider a random vector $ \rvx $, whose probability distribution is $ p(\bx \mid \btheta) $, where $ \btheta $ is an unknown parameter of the distribution. 
We are given a set of observations of this random variable, denoted by $ \mathcalX = \{\bx_1, \bx_2, \ldots, \bx_n\} $, which are assumed to be i.i.d. samples from the same distribution $ p(\bx \mid \btheta) $.

The probability of observing a single sample $ \bx_i $ is $ p(\bx_i\mid \btheta) $. Therefore, the joint probability of observing all $n$ samples is given by:
$ p(\mathcalX\mid \btheta) = p(\bx_1, \bx_2, \ldots, \bx_n\mid \btheta). $
Since the samples are i.i.d., the joint probability can be expressed as the product of individual probabilities:
\begin{equation}
p(\mathcalX\mid  \btheta) = p(\bx_1, \bx_2, \ldots, \bx_n\mid \btheta) = \prod_{i=1}^{n} p(\bx_i\mid \btheta).
\end{equation}
Now, we aim to determine the value of $\btheta$ that best explains the observed data. We assume $\btheta$ lies in a parameter space $ \Theta $. There are many possible values for $\btheta$, so we need a criterion to evaluate and compare them. A natural choice is the joint probability of the observed data under each candidate value of $\btheta$. Intuitively, the most plausible value of $\btheta$ is the one that makes the observed data most probable.



This joint probability, viewed as a function of $\btheta$, is called the \textit{likelihood function}, denoted by $\mathcalL(\btheta; \mathcalX)$, or simply $\mathcalL(\btheta)$ when the dataset $\mathcalX$ is clear from context:
\begin{equation}
\mathcalL(\btheta; \mathcalX) = p(\mathcalX\mid \btheta) = \prod_{i=1}^{n} p(\bx_i\mid \btheta).
\end{equation}
The optimal value of $ \btheta $ is the value  that maximizes this likelihood function. 
Therefore, the estimate of the parameter $ \btheta $ is given by:
\begin{equation}
\widehatbtheta_{\ML} = \arg\max_{\btheta} \mathcalL(\btheta; \mathcalX) = \arg\max_{\btheta} \prod_{i=1}^{n} p(\bx_i\mid \btheta).
\end{equation}
More formally, we define the maximum likelihood estimator as follows:
\begin{definition}[Maximum likelihood estimator]
Let $\bx_1, \bx_2, \ldots, \bx_n$ be a set of i.i.d. random samples from a distribution $F_{\btheta}$ with density $p(\bx\mid \btheta)$, $\forall\ \btheta \in \Theta$. Then the \textit{maximum likelihood estimator (MLE)} of $\btheta$ is the value  $\widehatbtheta_{\ML}$ such that 
\begin{equation}
\mathcalL(\btheta; \mathcalX) \leq \mathcalL(\widehatbtheta_{\ML}), \ \forall\btheta \in \Theta, \nonumber
\end{equation}
where $\mathcalL(\btheta; \mathcalX) = \prod_{i=1}^{n} p(\bx_i\mid  \btheta)$ is the \textit{likelihood function} for the i.i.d. collection. That is, the MLE of $\btheta$ can be obtained by 
$$
\widehatbtheta_{\ML} = \mathop{\arg\max}_{\btheta\in \Theta}	\mathcalL(\btheta; \mathcalX). 
$$
\end{definition}

Notice that the likelihood function $\mathcalL(\btheta; \mathcalX)$ is a \textit{random function}, since it depends on the random samples $\bx_1, \bx_2, \ldots, \bx_n$. And the meaning of the likelihood function is the probability of these specific observed samples $\{\bx_1, \bx_2, \ldots, \bx_n\}$ when the parameter is taken to be equal to $\btheta$ rather than the probability of the parameter $\btheta$. In other words, it is the joint density of the sample, but viewed as a function of $\btheta$.

\begin{remark}[\textbf{Estimation method} vs \textbf{estimator} vs \textbf{estimate}]
Note that an \textit{estimation method} is a general algorithm to produce the estimator. \textit{An estimate} is the specific value that \textit{an estimator} takes when observing the specific value, i.e., an estimator is a random variable and the realization of this random variable is called an estimate.
\end{remark}

A key observation about the likelihood function is that it is defined as the product of individual probabilities $ p(\bx_i\mid \btheta) $. 
Since each term in this product lies between 0 and 1, multiplying many such terms results in a very small number, often too small to be accurately represented by computer systems due to floating-point precision limits.
To avoid numerical underflow and simplify computation, especially during optimization, we typically work with the log-likelihood function, denoted by $ \ell(\btheta; \mathcalX)$ or simply $ \ell(\btheta)  $ when the data $\mathcalX$ is understood:
\begin{equation}
\ell(\btheta; \mathcalX) = \ln \mathcalL(\btheta; \mathcalX).
\end{equation}
Maximizing the log-likelihood function $ \ell(\btheta; \mathcalX) $ to obtain $ \widehatbtheta_{\ML} $ is mathematically equivalent to  maximizing the original likelihood function $ \mathcalL(\btheta; \mathcalX) $, since the logarithm  is a monotone increasing function:
\begin{equation}\label{equation:mle_loglik}
\widehatbtheta_{\ML} = \arg\max_{\btheta} \ell(\btheta; \mathcalX)
= \arg\max_{\btheta} \ln \prod_{i=1}^{n} p(\bx_i\mid \btheta)
= \arg\max_{\btheta} \sum_{i=1}^{n} \ln p(\bx_i\mid \btheta).
\end{equation}

\paragrapharrow{Estimation Methods For MLE.}
To compute the maximum likelihood estimate, we need to maximize the (log-)likelihood function. There are three primary methods commonly used for this purpose:
\begin{enumerate}[(i)]
\item \textit{Analytic method}, also known as the direct solution method. 
This approach involves finding critical points of the log-likelihood function by setting its gradient (i.e., vector of partial derivatives with respect to $\btheta$) equal to zero:
\begin{equation}\label{equation:mle_stat}
\nabla_{\btheta} \ell(\btheta; \mathcalX)=\frac{\partial \ell(\btheta; \mathcalX) }{\partial \btheta} = \bzero.
\end{equation}
The solutions to this equation are called \textit{stationary points}, which may correspond to local maxima, minima, or saddle points. Therefore, Equation~\eqref{equation:mle_stat} provides a necessary but not sufficient condition for a maximum.
The nature of these points (maximum or minimum) can be determined by the second derivative at the stationary points (see, for example, \citet{lu2025practical}). 
Not all problems can yield an analytic solution, and often it is not directly solvable. 
When a unique maximum of the likelihood function exists, we refer to it as  \textit{the} maximum likelihood estimator $\widehatbtheta = \mathop{\arg\max}_{\btheta\in \Theta}	\ell(\btheta; \mathcalX)$. 
If the likelihood function is twice differentiable, this can be done by verifying the   second derivative (Hessian matrix, see Problem~\ref{problem:pos_hessian}, or \citet{lu2025practical}):
$$
-\nabla^2_{\btheta} \ell(\btheta; \mathcalX)|_{\btheta=\widehatbtheta} \succ \bzero.
$$

\item \textit{Grid search method}. 
When the parameter space $\Theta$ is low-dimensional and bounded, we can discretize it into a grid of candidate values and evaluate the likelihood (or log-likelihood) at each point. The value that yields the highest likelihood is taken as the estimate. Although conceptually simple and guaranteed to find the global maximum (given a sufficiently fine grid), this method becomes computationally infeasible as the number of parameters increases, due to the exponential growth of the grid size (a problem known as the ``curse of dimensionality").

\item \textit{Numerical method}.
These are the most widely used techniques in practice. They begin with an initial guess $\btheta^\topone$, and iteratively update the parameter estimate using information from the gradient (and possibly the Hessian) of the log-likelihood function.  
Gradient descent (ascent), Newton-Raphson, mirror descent (ascent), all fall into this category \citep{lu2025practical}.
These iterative procedures are particularly effective in high-dimensional settings where analytic solutions are unavailable or grid search is impractical
\end{enumerate}

In this chapter, we will primarily focus on deriving and applying the analytic method for computing maximum likelihood estimators.

\index{Bayesian estimation}
\index{Point estimation}
\index{Maximum likelihood estimation}
\subsection{Bayesian Estimation}\label{section:bayes_esti}

Another related estimation method is called the \textit{maximum a posteriori (MAP) estimation} (see Section~\ref{equation:map_esti}).
Before discussing the details of MAP estimation, we first introduce the fundamental components of another widely used parameter estimation approach: the \textit{Bayesian estimation} or the \textit{Bayesian approach}; its application to linear models will be discussed  in detail in Chapter~\ref{chapter:bayes_app_mle}. 
The foundational idea of Bayesian estimation is attributed to \textit{Thomas Bayes}, who developed the concept but passed away before publishing it. Fortunately, his friend \textit{Richard Price} continued his work and published it in 1764. The same principle was later independently rediscovered by \textit{Laplace} at the end of the 18-th century.
In this section, we present the basic concepts of the Bayesian approach.

In MLE, the parameter $\btheta$ is treated as a fixed numerical quantity,  and only the variable $\mathcalX$ is considered random. 
The probability distribution of the random variable $\mathcalX$ is described by a parametric distribution $p(\mathcalX \mid \btheta)$, where observed samples are used to estimate the unknown parameter $\btheta$. This leads to an estimate $\widehatbtheta$, which is then substituted back into the conditional probability function $p(\mathcalX\mid {\btheta})$, yielding an estimated distribution for $\mathcalX$. This distribution can subsequently be used for predicting new samples: $p(\mathcalX=\bx_{\new} \mid \widehatbtheta)$.

\subsection*{Prior Distribution and Joint Probability}
A key distinction in the Bayesian framework is that unknown parameters are treated as random variables. This aligns with the Bayesian perspective on uncertainty, which treats all uncertain quantities probabilistically by modeling them as random variables and applying the laws of probability to reason about them. Unlike MLE, which focuses on finding the most likely value of a parameter, Bayesian methods aim to account for all possible values through integration over the full parameter space.

More formally, in the Bayesian framework, the model parameter $\btheta$ is treated as a random variable. A sample is generated jointly by $\btheta$ and $\mathcalX$. Let the probability distribution of $\btheta$ be  $p(\btheta)$, known as the \textit{prior distribution}. The random variable $\mathcalX$ depends on $\btheta$, and thus its distribution is given by the conditional probability $p(\mathcalX \mid \btheta)$. Together, these define a joint probability distribution from the chain rule of probability:
$$
p(\mathcalX, \btheta) = p(\btheta) p(\mathcalX\mid \btheta).
$$
Here, the conditional probability distribution $p(\mathcalX\mid \btheta)$ is the probability distribution of variable $\mathcalX$, which is known. 
While the prior $p(\btheta)$ reflects our initial beliefs or knowledge about the parameter before observing any data. Our ultimate goal is to determine the marginal distribution of $\mathcalX$, which can then be used for prediction.
There are two main approaches to achieve this:
\begin{enumerate}[(i)]
\item Find an estimate $\widehatbtheta$ of $\btheta$, and then obtain the conditional probability distribution $p(\mathcalX\mid \widehatbtheta)$. Use the conditional probability distribution $p(\mathcalX\mid \widehatbtheta)$ as the probability distribution of $\mathcalX$ for subsequent prediction and analysis. Maximum likelihood estimation falls into this category.

\item Use the joint distribution $p(\mathcalX, \btheta)$ to compute the marginal distribution of $\mathcalX$ by integrating out the parameter $\btheta$:
$$
p(\mathcalX) = \int p(\mathcalX, \btheta) \, d\btheta = \int p(\btheta)\,p(\mathcalX \mid \btheta) \, d\btheta.
$$
In this case, the marginal distribution $p(\mathcalX)$ directly accounts for all possible values of $\btheta$, weighted by their prior probabilities. This is the core idea behind Bayesian estimation.
\end{enumerate}

The challenge with the second approach lies in specifying the prior distribution $p(\btheta)$. Without knowledge of the prior, it is not possible to compute the marginal distribution$p(\mathcalX)$. Bayesian estimation belongs to the second method, utilizing Bayes' theorem to update our beliefs about $\btheta$ in light of observed data, thereby leading to a full \textit{posterior distribution} over the parameter.

\index{Bayes' theorem}
\subsection*{Bayes' Theorem}
We know that the variables $\btheta$ and $\mathcalX$ form a joint probability $p(\btheta, \mathcalX)$, and these two variables are not  independent of each other. 
In this context, the variable $\btheta$ influences the variable $\mathcalX$, and there exists a ``cause-effect" relationship between them: $\btheta$ is the ``cause" or \textit{prior}, and $\mathcalX$ is the ``effect." 
Using the chain rule of probability, the joint distribution $p(\mathcalX, \btheta)$ can be expressed as a product of conditional probabilities. Importantly, the application of the chain rule depends on the dependence (or independence) relationships between the variables, not on the order in which they appear. Therefore, the joint probability $p(\mathcalX, \btheta)$ can be decomposed in two equivalent ways:
$$ 
p(\mathcalX, \btheta) = p(\btheta) p(\mathcalX\mid \btheta) = p(\mathcalX) p(\btheta\mid \mathcalX) .
$$
Rearranging terms yields that
\begin{equation}\label{equation:bayes_firs}
p(\btheta\mid \mathcalX) = \frac{p(\btheta) p(\mathcalX\mid \btheta)}{p(\mathcalX)}.
\end{equation}
This equation is known as \textit{Bayes' theorem}. To define it formally:
\index{Bayes' theorem}
Bayesian modeling and statistics are fundamentally driven by  Bayes' theorem. Formally, the theorem is expressed as follows.
\begin{theoremHigh}[Bayes' theorem]\label{theorem:baye_theo_mle}
Let $\sS$ be a sample space and let $B_1, B_2, \ldots, B_K$ be a partition of $\sS$ such that (1). $\cup_k B_k=\sS$ and (2). $B_i \cap B_j=\varnothing$ for all $i\neq j$. 
Let further $A$ be any event. Then it follows that 
$$
P(B_k \mid A) = \frac{P(A \mid B_k)P(B_k)}{P(A)} = \frac{P(A\mid B_k)P(B_k)}{\sum_{i=1}^{K}P(A\mid B_i)P(B_i)}.
$$
\end{theoremHigh}
The core of Bayes' theorem is the following transformation:
$$ 
p(\text{cause}\mid \text{effect}) 
= \frac{p(\text{cause}) p(\text{effect}\mid \text{cause})}{p(\text{effect})}.
$$
In many practical situations, we observe the ``effect"---that is, we have observed values of the variable $\mathcalX$---but we do not know what caused this effect, i.e., the value of the variable $\btheta$ is unknown. In such cases, we can use Bayes' theorem to infer the likely cause, a process commonly referred to as \textit{Bayesian inference} or \textit{Bayesian estimation}. 

\subsection*{Bayesian Inference}
We consider variable $\btheta$ as the ``cause" variable and variable $\mathcalX$ as the ``effect" variable. 
The observed value of $\mathcalX$ represents the observed result. We express the relationship between the observed sample $\mathcalX$ and the variable $\btheta$ using Bayes' theorem:
\begin{equation}\label{equation:bayes_seco}
p(\btheta\mid \mathcalX) = \frac{p(\mathcalX\mid \btheta) p(\btheta)}{p(\mathcalX)},
\end{equation}
where
$p(\btheta\mid \mathcalX)$ represents the \textit{posterior probability distribution} of the cause variable $\btheta$ based on the effect $\mathcalX$. 
The term ``posterior" indicates that this distribution is updated after observing the data (i.e., after gaining experience from the effect).
$p(\btheta)$ represents the \textit{prior probability distribution},  which reflects our knowledge or assumptions about $\btheta$ before observing the data. In practical applications, the prior can be chosen based on domain knowledge or previous experience. If no prior information is available, it is common to assume a uniform distribution for $p(\btheta)$. 
$ p(\mathcalX\mid \btheta) $ is the likelihood function, which represents the probability of observing the sample $\mathcalX$ given the parameter $\btheta$. If the observed samples are independent and identically distributed, this term can be written as:
$$
p(\mathcalX\mid \btheta) = p(\{\bx_1, \bx_2, \ldots, \bx_n\}\mid \btheta)
= \prod_{i=1}^{n} p(\bx_i\mid \btheta).
$$
$ p(\mathcalX) $ is referred to as the \textit{evidence} or \textit{marginal likelihood}. It represents the total probability of observing the data $\mathcalX$, regardless of the value of $\btheta$. It acts as a normalizing constant that ensures the posterior distribution integrates to 1 (i.e., it is a valid probability distribution). Mathematically, it is defined as:
$$
p(\mathcalX) = \int p(\mathcalX\mid \btheta) p(\btheta) d\btheta.
$$
In other words, once the sample set $\mathcalX$ is determined, the value of $ p(\mathcalX) $  remains constant.

In summary, the Bayesian inference formula in \eqref{equation:bayes_seco} can be expressed in the following form:
\begin{equation}\label{equation:bayes_thir}
\text{Posterior probability} = \frac{\text{Likelihood} \times \text{Prior}}{\text{Evidence}}
\propto \text{Likelihood} \times \text{Prior},
\end{equation}
where ``$\propto$" means ``proportional to" (see Problem~\ref{problem:proptionto}).
However, using \eqref{equation:bayes_thir} to infer $ p(\btheta\mid \mathcalX) $ still has two difficulties:
\begin{enumerate}[(i)]
\item Choosing an appropriate prior distribution $p(\btheta)$. This should ideally reflect prior knowledge about the parameters, but such information is often unavailable.
\item Evaluating the denominator $p(\mathcalX)$, which involves integrating over all possible values of $\btheta$. The complexity of this integration depends heavily on the form of the prior $p(\btheta)$.
\end{enumerate}

Note that, for brevity, we previously omitted the hyper-parameter from the prior distribution.
When including a hyper-parameter in the prior, Equation~\eqref{equation:bayes_seco} becomes:
\begin{equation}\label{equation:baye_hyper}
\begin{aligned}
p(\btheta \mid  \mathcalX, \balpha) 
&= \frac{p(\mathcalX \mid  \btheta ) p(\btheta \mid  \balpha)}{p(\mathcalX \mid \balpha)} 
= \frac{p(\mathcalX \mid  \btheta ) p(\btheta \mid  \balpha)}{\int_{\btheta}  p(\mathcalX, \btheta \mid \balpha) }  \\
&= \frac{p(\mathcalX \mid \btheta ) p(\btheta \mid \balpha)}{\int_{\btheta}  p(\mathcalX \mid  \btheta ) p(\btheta \mid  \balpha) }  
\propto p(\mathcalX \mid \btheta ) p(\btheta \mid \balpha).
\end{aligned}
\end{equation}
where the marginal likelihood or evidence becomes $p(\mathcalX\mid \balpha)$.
This form will prove useful when discussing \textit{Occam's Razor} and \textit{Occam factor} in Section~\ref{section:occam_razor}.
For the remainder of this section, we will continue using the version without a hyper-parameter.

Theoretically, the prior distribution should encode any existing or priori knowledge (e.g., the parameter is sparse or dense) about the parameters. For example, a prior for a system reducing over-clustering might assign a higher probability to a larger cluster than to a small cluster \citep{lu2021survey}. 
However, in many cases, little or no prior information is available. In such situations, we use a special type of prior called a \textit{noninformative prior}. The purpose of a noninformative prior is to have minimal influence on the posterior distribution \citep{gelman2013bayesian}, allowing the data to ``speak for itself."

Another commonly used approach is the \textit{conjugate prior}. In Bayesian inference, a conjugate prior is a prior distribution such that the resulting posterior distribution belongs to the same family of distributions as the prior. This simplifies computation and interpretation.
Conjugate priors are widely used due to their mathematical convenience. Further details  can be found in Chapter~\ref{chapter:bayes_app_mle}.

The elegance of Bayes' theorem becomes apparent as it distinguishes inference from modeling. The model, encompassing the prior distribution and the likelihood, fully dictates the posterior distribution, leaving the computation of the inference as the only remaining step.
More generally, the Bayesian approach---in a nutshell---is to assume a prior distribution for any unknowns ($\btheta$ in our case), and then just follow the rules of probability to answer any questions of interest. For example, when we find the parameter based on the maximum posterior probability of $\btheta$, we turn to the \textit{maximum a posteriori (MAP)} estimation; see the next section.

\subsection*{Prediction}
The posterior distribution alllows us to compute the probability density at a new coming data point $\bx_{\new}$, called the \textit{posterior predictive distribution}, by averaging over both the uncertainty in the model and in the parameters:
$$
p(\bx_{\new} \mid \mathcalX) = \int p(\bx_{\new} \mid \btheta) p(\btheta \mid \mathcalX)\, d\btheta.
$$
If the problem  follows from a generative process $\rvy\sim p(\by\mid \bx,\btheta)$, e.g., $\ry\sim \bbeta^\top\bx_{\new} +\epsilon$ in the Gauss-Markov linear model. 
Then the predictive distribution is 
$$
p(\by^\prime\mid\bx_{\new},\mathcalX, \mathcalY)=\int p(\by^\prime \mid \bx_{\new}, \btheta)p(\btheta \mid \mathcalX,\mathcalY)\,d\btheta.
$$
The posterior predictive distribution can be employed to design test statistics of interest and then compare the posterior predictive distributions to the test statistics of observed values so as to determine the best model among several candidates. This process is known as \textit{model checking or selection}; see Chapter~\ref{chapter:model_eva_sel}.

\index{Model checking}
\index{Model selection}

\index{Maximum a posteriori estimation}
\subsection{Maximum A Posteriori (MAP) Estimation}\label{equation:map_esti}

One major challenge in Bayesian estimation is computing the marginal likelihood $ p(\mathcalX) $, which requires integrating over the entire parameter space. This integration can be computationally expensive or even analytically intractable.

However, if our goal is only to make predictions and not to fully characterize the posterior distribution, we may instead use a point estimate of the parameters. This approach resembles maximum likelihood estimation, but incorporates prior information.



In Bayesian estimation, one common method is to compute the expected value of the posterior distribution as the estimate of the parameters. However, calculating this expectation requires full knowledge of the posterior distribution.
Alternatively, we can use the value of $\btheta$ that maximizes the posterior distribution as our estimate. This is known as \textit{maximum a posteriori (MAP)} estimation:
$$
\widehatbtheta_{\MAP} = \arg\max_{\btheta} p(\btheta\mid \mathcalX).
$$
Formally, the MAP estimator is defined as follows.
\begin{definition}[Maximum a posterior estimator]\label{definition:map_esti}
The maximum a posteriori (MAP) estimate is the value of the parameter that maximizes the posterior distribution.
It balances information from the prior distribution and information from the likelihood. 
The influence of the prior is stronger when the likelihood provides less information, and vice versa.
\end{definition}

Recall from \eqref{equation:bayes_seco} that the denominator $ p(\mathcalX) $ of the posterior probability  is a constant with respect to $\btheta$. 
Therefore, the posterior probability is proportional to the product of the prior and the likelihood:
$$
\text{Posterior Probability} = \frac{\text{Likelihood} \times \text{Prior}}{\text{Evidence}}
\propto \text{Likelihood} \times \text{Prior}.
$$
When performing MAP estimation, it is not necessary to compute the exact form of the posterior distribution, since we are only interested in its maximum. Thus, maximizing the numerator alone suffices:
$$
\begin{aligned}
\widehatbtheta_{\MAP} &= \arg\max_{\btheta} p(\btheta\mid \mathcalX)
\equiv \arg\max_{\btheta} \text{Likelihood} \times \text{Prior} \\
&= \arg\max_{\btheta} \mathcalL(\btheta; \mathcalX) p(\btheta)
= \arg\max_{\btheta} \ln \mathcalL(\btheta; \mathcalX) p(\btheta) \\
&= \arg\max_{\btheta} \underbrace{\ln \mathcalL(\btheta; \mathcalX)}_{\text{Log-likelihood}} + \underbrace{\ln p(\btheta)}_{\text{Log Prior}}.
\end{aligned}
$$
Compared to the MLE in \eqref{equation:mle_loglik}, we see that MAP estimation also maximizes the likelihood, but it includes an additional term---the log prior---which regularizes the estimate based on our prior beliefs about $\btheta$. In many ways, MAP estimation resembles maximum likelihood estimation, but it incorporates prior information, making it more robust in cases where data is limited.

In fact, incorporating a prior in MAP estimation is equivalent to adding a regularization term to the loss function. Specifically, introducing a Laplace prior for the parameters is equivalent to applying $\ell_1$ regularization, while using a Gaussian prior corresponds to $\ell_2$ regularization.

\index{Laplace approximation}
\subsection{Laplace Approximation}\label{section:laplace_approx}
We previously noted that the posterior distribution can be used to answer any questions of interest, including obtaining the MAP estimate:
$$
\widehatbtheta_{\text{MAP}} 
= \argmax_{\btheta\in\Theta} p(\btheta \mid  \mathcalX) 
= \argmax_{\btheta\in\Theta} p(\mathcalX \mid \btheta ) p(\btheta ).
$$
The \textit{Laplace approximation} involves approximating the posterior using a Gaussian distribution centered at the mode of the posterior (i.e., the MAP estimate $\widehatbtheta_{\text{MAP}} $), therefore approximating the posterior distribution of a model's parameters when the exact form of that distribution is intractable or computationally expensive to calculate  \citep{kass1995bayes, mackay1998choice, friston2007variational}. 
Define the logarithm of the posterior distribution as 
$$
\ell(\btheta) =\ln p(\mathcalX \mid \btheta ) p(\btheta )
=\ln p(\mathcalX \mid \btheta ) + \ln p(\btheta ) =\ln p(\btheta \mid  \mathcalX) + \mathcalC,
$$
where again $\ln$ is the natural logarithm (to base $e$),  and $\mathcalC$ represents a constant w.r.t. $\btheta$.
According to the quadratic approximation theorem (Theorem~\ref{theorem:quad_app_theo}), assuming that the parameter space $\Theta$ is an open set (the gradient of the MAP has vanished gradient), we have 
$$
\begin{aligned}
	\ell(\btheta) &\approx 
	\ell(\widehatbtheta) +\nabla \ell(\widehatbtheta )^\top (\btheta - \widehatbtheta )
	+\frac{1}{2} (\btheta - \widehatbtheta )^\top \nabla^2 \ell(\widehatbtheta ) (\btheta - \widehatbtheta )\\
	&=\ell(\widehatbtheta ) 
	+\frac{1}{2} (\btheta - \widehatbtheta )^\top \nabla^2 \ell(\widehatbtheta ) (\btheta - \widehatbtheta ),
\end{aligned}
$$
where we let $\widehatbtheta\triangleq \widehatbtheta_{\text{MAP}} $ for brevity.
Therefore, the log marginal likelihood can be obtained by 
$$
\begin{aligned}
	\begin{aligned}
		\ln p(\mathcalX ) &= \ln \int p(\mathcalX \mid \btheta) p(\btheta ) d\btheta 
		=\ln  \int \exp\{\ell(\btheta) \}d\btheta\\
		&\approx \ln p(\mathcalX \mid \widehatbtheta ) + \ln p(\widehatbtheta ) + \frac{p}{2} \ln (2\pi) - \frac{1}{2}\ln \abs{\nabla^2 \ell(\widehatbtheta )},
	\end{aligned}
\end{aligned}
$$
where  the last approximation comes from the definition of the multivariate Gaussian distribution, and $p$ is the dimensionality  of the parameter vector: $\btheta\in\real^p$.
Therefore, the Laplace approximation of the marginal likelihood becomes 
\begin{equation}\label{equation:laplace_approx}
p(\mathcalX )_{\text{Lap}} = 
\underbrace{p(\mathcalX \mid \widehatbtheta )}_{\text{data likelihood under MAP}} \,
\underbrace{p(\widehatbtheta )}_{\text{penalty from prior}} 
\underbrace{\abs{2\pi (\nabla^2 \ell(\widehatbtheta ))^{-1}}}_{\text{local curvature}}.
\end{equation}
Thus, the Laplace approximation contains three terms: the data likelihood under the MAP estimate, a penalty term from the prior, and a volume term due to the local curvature.

While the Laplace approximation can be useful and computationally efficient, it has several drawbacks. 
\begin{itemize}
	\item \textbf{Gaussian assumption}: Firstly, the Laplace approximation assumes that the posterior distribution is approximately Gaussian, which might not be a good assumption, especially for multimodal posteriors or posteriors that have heavy tails or skewed distributions. This can lead to inaccurate estimates of the uncertainty around the parameters. 
	The Gaussian approximation is also ill-suited for parameters that are bounded, constrained, or positive, such as mixing proportions or precisions, because it assigns nonzero probability mass to values outside the valid parameter range. While this issue can often be mitigated by reparameterizing the variables (see, for example, \citet{mackay1998choice}), there remains an undesirable aspect: in the non-asymptotic regime, the approximation lacks invariance to reparameterization.
	
	\item \textbf{Mode dependence}: The approximation relies heavily on the location of the mode of the posterior distribution. If the mode is not well-defined or if there are multiple modes, the Laplace approximation may perform poorly.
	
	\item \textbf{Curvature assumption}: The Laplace approximation assumes that the curvature of the posterior distribution around the mode is constant, which is often not the case for complex models. This can lead to poor performance when the posterior distribution has significant curvature changes over its support.
	
	\item \textbf{Computation of Hessian}: Computing the Hessian matrix, which is required to determine the variance of the Gaussian approximation, can be computationally expensive and unstable, particularly for models with many parameters or non-smooth likelihood functions. The computation of the volume term, which depends on the determinant of the Hessian matrix ($|\nabla^2 \mathcalL(\widehatbtheta )|$), poses another challenge. Calculating the derivatives within the Hessian requires  $\mathcalO(np^2)$ operations, followed by $\mathcalO(p^3)$ operations to find the determinant, making it computationally intensive for high-dimensional problems. To simplify this process, approximations often ignore off-diagonal elements or assume a block-diagonal structure for the Hessian, effectively disregarding interdependencies among parameters. 
	
	\item \textbf{Sensitivity to priors}: The Laplace approximation can be sensitive to the choice of prior, especially when the prior is not weakly informative. In such cases, the approximation might not accurately reflect the shape of the posterior distribution.
	
	\item \textbf{Dimensionality issues}: As the number of parameters increases, the Laplace approximation becomes less reliable due to the curse of dimensionality, where the volume of the parameter space grows exponentially and the Gaussian approximation becomes increasingly poor.

\end{itemize}
Despite these limitations, the Laplace approximation can still be a valuable tool, especially when used alongside other methods like Markov chain Monte Carlo (MCMC) sampling or variational inference, which can provide more accurate representations of the posterior distribution \citep{hoff2009first, lu2023bayesian}.

\index{Mean squared error}
\subsection{Mean Squared Error (MSE) of an Estimator}
Let $\rvx_1, \rvx_2, \ldots, \rvx_n$ be $n$ i.i.d. random vector variables, i.e., a random sample drawn from the distribution $p(\bx \mid \btheta)$, where $\btheta$ is unknown. An \textit{estimator} of $\btheta$ is any function of (only) these $n$ random variables; in other words, it is a statistic $\widehatbtheta = T(\rvx_1,\rvx_2, \ldots, \rvx_n)$. 
As discussed in previous sections, there are several methods available for obtaining an estimator of $\btheta$, such as maximum likelihood estimation and Bayesian methods.

A key challenge arises when multiple estimation methods can be applied to the same problem, leading us to choose among different estimators. In some cases, different methods may yield the same estimator (for example, in least squares estimation under certain conditions with MLE), simplifying the comparison. However, in many situations, each method results in a distinct estimator. Therefore, we need criteria to evaluate and compare their performance.
There are various measures used to assess the quality of an estimator. Some are designed for small samples, while others describe the behavior of an estimator as the sample size grows---these are known as asymptotic properties of estimators.

To evaluate how close an estimator $\widehatbtheta$ is to the true value $\btheta$, we often consider the deviation  $\normonebig{\widehatbtheta - \btheta}$ or, for computational convenience, the squared error  $\normtwobig{\widehatbtheta - \btheta}^2$. Since $\widehatbtheta$ is itself a random variable (depending on the random sample), we must take expectation over the sampling distribution to assess its overall accuracy. This leads to the following definition:
\begin{definition}[Mean squared error of an estimator]
Let $\rvx_1, \rvx_2, \ldots, \rvx_n$ be $n$ i.i.d. random vector variables.
The \textit{mean squared error (MSE)} or the \textit{risk function} of an estimator $\widehatbtheta= T(\rvx_1,\rvx_2, \ldots, \rvx_n)$ of a parameter $\btheta$ is the function of $\btheta$ defined by $\Exp[\normtwobig{\widehatbtheta - \btheta}^2]$ (the expectation is taken with respect to the random variables $\rvx_1,\rvx_2, \ldots, \rvx_n$), and this is denoted as $\MSE(\widehatbtheta, \btheta)$.
\end{definition}

Notice that the MSE quantifies  the average squared difference between the estimator $\widehatbtheta$ and the parameter $\btheta$, providing a reasonable way to assess the performance of an estimator. While other measures, such as the \textit{mean absolute error (MAE)}, defined as $\Exp\big[\normonebig{\widehatbtheta - \btheta}\big]$, are also valid, the MSE has two main advantages: it is mathematically convenient and allows for a useful decomposition into bias and variance.
\index{Bias-variance decomposition}
\begin{lemma}[Bias-variance decomposition]\label{lemma:bias-variance}
For any estimator $\widehatbtheta$ of $\btheta\in\real^p$, the mean squared error of the estimator has the following decomposition:
$$
\begin{aligned}
\MSE(\widehatbtheta, \btheta) &= \normtwo{\Exp[\widehatbtheta] - \btheta}^2 + \Exp\left[\normtwo{\widehatbtheta - \Exp(\widehatbtheta)}^2\right] 
\triangleq \normtwo{\bias(\widehatbtheta, \btheta)}^2 + \sum_{i=1}^{p} \Var[\widehat{\theta}_i].
\end{aligned}
$$
That is, a sum of a bias term and a variance term.
\end{lemma}

\begin{proof}[of Lemma~\ref{lemma:bias-variance}]
Write out the mean squared error
$$
\begin{aligned}
\MSE(\widehatbtheta, \btheta) 
&= \Exp\big[\normtwobig{\widehatbtheta - \btheta}^2\big] 
=\Exp\big[\normtwobig{\widehatbtheta}^2\big] - 2\btheta^\top\Exp\big[\widehatbtheta\big]  + \normtwo{\btheta}^2\\
&=\Exp\big[\normtwobig{\widehatbtheta}^2\big] - \normtwobig{\Exp\big[{\widehatbtheta}\big]}^2+ 
\normtwobig{{\Exp\big[{\widehatbtheta}\big]}- \btheta}^2 ,
\end{aligned}
$$
which completes the proof.
\end{proof}

The bias of an estimator $\widehatbtheta$ of a parameter $\btheta$ is defined the difference between the expected value of the estimator $\widehatbtheta$ and the true parameter value $\btheta$; that is, $\bias(\widehatbtheta, \btheta) = \Exp[\widehatbtheta] - \btheta$. An estimator whose bias is identically equal to $\bzero$ is called an unbiased estimator and satisfies $\Exp[\widehatbtheta] = \btheta$ for all $\btheta$.

\begin{definition}[Biased and unbiased estimators]\label{defintion:biased_unbiased}
Given the estimator $\widehatbtheta$ of a parameter $\btheta$, the quantity $\Exp[\widehatbtheta]-\btheta=\bias(\widehatbtheta,\btheta)$ is called the \textit{bias} of the estimator  $\widehatbtheta$ with respect to the true parameter $\btheta$.
When the bias at some coordinate of $\btheta$ is positive, we have \textit{overestimation}; conversely, when it is negative, we have \textit{underestimation}; when the bias is zero, we refer to it as an \textit{unbiased estimator}.

Suppose both $\widehatbtheta$ and $\widetildebtheta$ are unbiased estimators of an unknown parameter $\btheta$. 
Then, any linear combination of these estimators of the form $\bt = w \widehatbtheta + (1 - w)\widetildebtheta$, where $w$ is any scalar weight, is also an unbiased estimator of $\btheta$.
\end{definition}

Thus, the MSE of an estimator has two components, one measures the \textit{variability of the estimator (precision)} and the other measures  its \textit{bias (accuracy)}. 
An estimator with good MSE performance has both low variance and low bias. To find such an estimator, one must carefully balance and control both sources of error.
While many commonly used estimators are unbiased or approximately unbiased, it's important to note that being unbiased does not necessarily imply a small MSE. In fact, there is often a trade-off between bias and variance. A small increase in bias may lead to a significant reduction in variance, resulting in an overall improvement in MSE.
For an unbiased estimator $\widehatbtheta$, the bias term vanishes, and the MSE simplifies to:
$$
\MSE(\widehatbtheta, \btheta) = \Exp\big[\normtwobig{\widehatbtheta - \btheta}^2\big] = 
\sum_{i=1}^{p} \Var[\widehat{\theta}_i]
\triangleq 
\Var[\widehatbtheta].
$$
In this case, the MSE is simply the sum of the variances of the individual components---that is, the total variance of the estimator.

\begin{example}[Estimators for Gaussian parameters]\label{example:est_gaus_param}
Let $\rx_1, \rx_2, \ldots, \rx_n$ be i.i.d. from $\normal(\mu, \sigma^2)$ with mean $\mu$ and variance $\sigma^2$, then the \textit{sample mean} $\bar{\rx} \triangleq \frac{1}{n}\sum_{i=1}^{n} \rx_i$ is an unbiased estimator for $\mu$, and the \textit{sample variance} $\rS^2\triangleq  \frac{\sum_{i=1}^n (\rx_i - \bar{\rx})^2}{n-1}$ is an unbiased estimator for $\sigma^2$.
To see this, we have 
$$
\Exp[\bar{\rx}]= \frac{\Exp[\rx_1] +\Exp[\rx_2] + \ldots + \Exp[\rx_n]}{n} = \mu.
$$
Therefore, $\bar{\rx}$ is an unbiased estimator. 
Since
$$
\Var[\bar{\rx}] = \Var \left[\frac{\rx_1 + \rx_2 + \ldots + \rx_n}{n}\right] 
= 
\frac{\Var[\rx_1] +\Var[\rx_2] + \ldots + \Var[\rx_n]}{n^2} = \frac{\sigma^2}{n},
$$
the MSE of $\bar{\rx}$ is
$$
\MSE(\bar{\rx}, \mu) = \Exp[(\bar{\rx} - \mu)^2] = \Var[\bar{\rx}] = \frac{\sigma^2}{n}.
$$

It can be shown that $\frac{(n-1)\rS^2}{\sigma^2} \sim \chi^2_{(n-1)}$. Using the properties of the Chi-squared distribution (Definition~\ref{definition:chisquare_dist}), we have
$$
\begin{aligned}
\Exp \left[ \frac{(n-1)\rS^2}{\sigma^2} \right] &= n-1
\quad\implies \quad
\Exp[\rS^2] = \sigma^2;\\
\Var \left[ \frac{(n-1)\rS^2}{\sigma^2} \right] &= 2(n-1) 
\quad\implies \quad
\Var[\rS^2] = \frac{2\sigma^4}{n-1}.
\end{aligned}
$$
Therefore, $ \rS^2 $ is an unbiased estimator for $ \sigma^2 $, and its MSE is:
$$
\MSE(\rS^2, \sigma^2) = \Exp[(\rS^2 - \sigma^2)^2] = \Var[\rS^2] = \frac{2\sigma^4}{n-1}.
$$

An alternative estimator for $ \sigma^2 $ of a Gaussian distribution   is the maximum likelihood estimator:
$$
\widehat{\sigma}^2 \triangleq  \frac{1}{n} \sum_{i=1}^n (\rx_i - \bar{\rx})^2 = \frac{n-1}{n} \rS^2.
$$
Therefore, we have 
$
\Exp[\widehat{\sigma}^2] = \Exp\big[\frac{n-1}{n} \rS^2\big] = \frac{n-1}{n} \sigma^2
$,
such that $ \widehat{\sigma}^2 $ is a biased estimator for $ \sigma^2 $. The variance of $ \widehat{\sigma}^2 $ can also be calculated as
$$
\Var[\widehat{\sigma}^2] = \Var[\frac{n-1}{n} \rS^2] = \frac{(n-1)^2}{n^2} \Var[\rS^2] =  \frac{2(n-1)\sigma^4}{n^2}.
$$
The MSE of $ \widehat{\sigma}^2 $ is then
\begin{align*}
\Exp[(\widehat{\sigma}^2 - \sigma^2)^2] &= \Var[\widehat{\sigma}^2] + (\bias)^2 
= \frac{2(n-1)\sigma^4}{n^2} + \left( \frac{n-1}{n} \sigma^2 - \sigma^2 \right)^2 = \frac{(2n-1)\sigma^4}{n^2}.
\end{align*}
Comparing the two estimators:
$$
\MSE(\widehat{\sigma}^2, \sigma^2) = \frac{(2n-1)\sigma^4}{n^2} < \frac{2n\sigma^4}{n^2} = \frac{2\sigma^4}{n} < \frac{2\sigma^4}{n-1} = \MSE(\rS^2, \sigma^2).
$$
This shows that $ \widehat{\sigma}^2 $ has a smaller MSE than $ \rS^2 $. Therefore, by trading off some bias, we achieve a reduction in overall estimation risk when measured by MSE.
\end{example}

The above example does not necessarily imply that $ \rS^2 $ should be discarded as an estimator of $ \sigma^2 $. 
While $ \widehat{\sigma}^2 $ may have a lower MSE, it systematically underestimates $ {\sigma}^2 $, which might make it less desirable depending on the context or interpretation.

In general, since MSE depends on the true parameter value, there is no single ``best" estimator across all possible values of the parameter. Often, the MSEs of two estimators will cross---one performs better for certain parameter values, while the other performs better for others. Still, such comparisons can offer useful guidance in selecting between estimators.

One way to make the problem of finding a ``best" estimator more manageable is to restrict the class of estimators considered. A common approach is to focus only on unbiased estimators and choose the one with the smallest variance. This leads to the concept of the \textit{best unbiased estimator (BUE)} or the \textit{minimum variance unbiased (MVU) estimator}.
If two estimators, $\widehatbtheta_1$ and $\widehatbtheta_2$, are both unbiased for a parameter $\btheta$, meaning $\Exp[\widehatbtheta_1] = \btheta$ and $\Exp[\widehatbtheta_2] = \btheta$, then their MSEs reduce to their variances. In this case, we prefer the estimator with the smaller variance.

\begin{example}[Estimators for Gaussian parameters, CNT.]
Consider the same setting as in Example~\ref{example:est_gaus_param}. One might ask: what value of $\gamma$ makes the quantity  $\gamma \sum_{i=1}^n (\rx_i - \bar{\rx})^2$ achieve the smallest MSE?
Note that when $\gamma = \frac{1}{n-1}$, we obtain the sample variance $\rS^2$ from Example~\ref{example:est_gaus_param}; whereas when $\gamma = \frac{1}{n}$, we obtain the estimator $\widehat{\sigma}^2$.
Let
$$
\begin{aligned}
\rs^2_\gamma &\triangleq \gamma \sum_{i=1}^n (\rx_i - \bar{\rx})^2 = \gamma (n-1) \rS^2;\\
\zeta &\triangleq \gamma (n-1).
\end{aligned}
$$
Then
$$
\begin{aligned}
\Exp[\rs^2_\gamma] &= \gamma (n-1) \Exp[\rS^2] = \gamma (n-1) \sigma^2 = \zeta \sigma^2;\\
\Var[\rs^2_\gamma] &= \gamma^2 (n-1)^2 \Var[\rS^2] = \frac{2\zeta^2 }{(n-1)}  \sigma^4.
\end{aligned}
$$
The MSE of $\rs^2_\gamma$ can be obtained as 
\begin{align*}
\MSE(\rs^2_\gamma, \sigma^2) &= \Var[\rs^2_\gamma] + (\bias)^2 = \Var[\rs^2_\gamma] + \left[\Exp[\rs^2_\gamma] - \sigma^2\right]^2 \\
&= \Var[\rs^2_\gamma] + (\zeta \sigma^2 - \sigma^2)^2 = \frac{2\zeta^2 }{(n-1)}  \sigma^4 + (\zeta-1)^2 \sigma^4 \triangleq g(\zeta)\sigma^4, 
\end{align*}
where $g(\zeta) \triangleq \frac{2\zeta^2 }{(n-1)}    + (\zeta-1)^2 $.
The function $ g(\zeta) $ reaches its minimum at $ \zeta = \frac{n-1}{n+1} $.
The minimal MSE is $ \MSE(\rs^2_\gamma, \sigma^2) = \frac{2}{n+1} \sigma^4 $, with $ \gamma(n-1) = \zeta = \frac{n-1}{n+1} $, i.e., $ \gamma = \frac{1}{n+1} $.
Therefore, for $ \gamma = \frac{1}{n+1} $ and $n>1$,  we have:
$$
\MSE(\rs^2_\gamma, \sigma^2) =\frac{2\sigma^4}{n+1} < 
\frac{(2n-1)\sigma^4 }{n^2}  =\MSE(\widehat{\sigma}^2, \sigma^2) 
< \frac{2\sigma^4}{n-1} = \MSE(\rS^2, \sigma^2).
$$
Thus, the estimator with the smallest MSE corresponds to an unbiased estimator.
\end{example}

\begin{example}[Estimators for Laplace parameters]
Let $\rx_1, \rx_2, \ldots, \rx_n$ be i.i.d. random variables with the probability density function $p(x\mid b) \triangleq \frac{1}{2b} \exp \left( -\frac{\abs{x}}{b} \right)$ (Definition~\ref{definition:laplace_distribution}).
The maximum likelihood estimator for $b$, $\widehat{b} = \frac{\sum_{i=1}^n |\rx_i|}{n}$,  is unbiased.
To see this, we  first calculate $\Exp[\abs{\rx}]$ and $\Exp[\abs{\rx}^2]$ as
\begin{align*}
\Exp[\abs{\rx}] &= \int_{-\infty}^{\infty} \abs{x} p(x\mid b)\, dx = \int_{-\infty}^{\infty} \abs{x} \frac{1}{2b} \exp \left( -\frac{\abs{x}}{b} \right) \,dx \\
&= b \int_{0}^{\infty} \frac{x}{b} \exp \left( -\frac{x}{b} \right) d\frac{x}{b} = b \int_{0}^{\infty} t e^{-t} dt = b \Gamma(2) = b
\end{align*}
and
\begin{align*}
\Exp[\abs{\rx}^2] &= \int_{-\infty}^{\infty} \abs{x}^2 p(x\mid b) \,dx = \int_{-\infty}^{\infty} \abs{x}^2 \frac{1}{2b} \exp \left( -\frac{\abs{x}}{b} \right) \,dx \\
&= b^2 \int_{0}^{\infty} \frac{x^2}{b^2} \exp \left( -\frac{x}{b} \right) d\frac{x}{b} = b^2 \int_{0}^{\infty} t^2 e^{-t} dt = b^2 \Gamma(3) = 2b^2.
\end{align*}
Therefore,
$
\Exp[\widehat{b}] = \Exp \left[(\abs{\rx_1} + \ldots + \abs{\rx_n})/n \right] = \frac{\Exp[\abs{\rx_1}] + \ldots + \Exp[\abs{\rx_n}]}{n} = b.
$
Thus, $\widehat{b}$ is an unbiased estimator for $b$.
Since the estimator is unbiased, its MSE is equal to its variance:
\begin{align*}
\MSE(\widehat{b}, b) &= \Exp[(\widehat{b} - b)^2] = \Var[\widehat{b}] = \Var\left[\frac{\abs{\rx_1} + \ldots + \abs{\rx_n}}{n}\right] \\
&= \frac{\Var[\abs{\rx_1}] + \ldots + \Var[\abs{\rx_n}]}{n^2} = \frac{\Var[\abs{\rx_1}])}{n} \\
&= \frac{\Exp[\abs{\rx}^2] - (\Exp[\abs{\rx}])^2}{n} = \frac{2b^2 - b^2}{n} = \frac{b^2}{n}.
\end{align*}
Once again, the large the sample size, the smaller the variance.
\end{example}

\index{Maximum likelihood estimator}
\section{Estimation for OLS}\label{section:mle-gaussian}


\index{Gauss-Markov linear model}
\index{Likelihood function}
\index{Log-likelihood function}
\index{Gaussian disturbance}
As mentioned above, the main focus of this chapter is to derive the MLE for the OLS model (i.e., the Gauss-Markov linear model). 
In this section, we will derive the analytic solutions of the MLE for OLS under both Gaussian and Laplace noise.
\subsection{Gaussian Noise}
From the likelihood arising from Gaussian disturbances, as given in \eqref{equation:likelihood-of-gaussiannoise} or its logarithmic form in \eqref{equation:loglikelihood-of-gaussiannoise}, we can obtain the maximum likelihood estimator for the Gauss-Markov linear model.
\begin{theoremHigh}[MLE for LS under i.i.d. Gaussian disturbance]\label{theorem:mle-gaussian}
Let $\rvy = \bX\bbeta + \bepsilon$, where $\bepsilon \sim \normal(\bzero, \sigma^2 \bI)$, which is known as the \textit{Gauss-Markov linear model} or \textit{Gaussian linear regression model}. 
Assume that $\bX \in \real^{n\times p}$ is fixed and has full rank with $n\geq p$ (i.e., its rank is $p$). Then, the maximum likelihood estimate~\footnote{We will also show this is the best linear unbiased estimate of $\bbeta$ in Theorem~\ref{theorem:gauss-markov}.} of $\bbeta$ is given by 
$$\widehatbbeta = (\bX^\top\bX)^{-1}\bX^\top\by~
\footnote{Note that the  maximum likelihood \textcolor{mydarkblue}{estimate} of $\bbeta$ is denoted by $(\bX^\top\bX)^{-1}\bX^\top\textcolor{mydarkblue}{\by}$ with italic fonts, while its maximum likelihood \textcolor{mydarkblue}{estimator}  is denoted by $(\bX^\top\bX)^{-1}\bX^\top\textcolor{mydarkblue}{\rvy}$ with normal fonts.}
$$ 
for all values of $\sigma^2$. 
And the maximum likelihood estimate~\footnote{We will show this is a biased estimate of $\sigma^2$; the unbiased estimate is discussed in Section~\ref{sec:dist_sse}.} of $\sigma^2$ is given by
$$
\widehat{\sigma}^2=\frac{1}{n}(\by-\bX\widehatbbeta)^\top(\by-\bX\widehatbbeta),
$$
which is equal to the average sum of squares due to error.
\end{theoremHigh}

\begin{proof}[of Theorem~\ref{theorem:mle-gaussian}]
Following Equation~\eqref{equation:likelihood-of-gaussiannoise}, the likelihood of this model under the given parameters $\bbeta$ and $\sigma^2$ is 
\begin{equation}
\mathcalL(\bbeta, \sigma^2) = \frac{1}{(2\pi \sigma^2)^{n/2}} \exp\left\{-\frac{1}{2\sigma^2} (\by-\bX\bbeta)^\top(\by-\bX\bbeta)\right\}, \nonumber
\end{equation}
and the log-likelihood is given by
\begin{equation}
\ell(\bbeta, \sigma^2) = -\frac{1}{2}\left\{ n\ln 2\pi + n\ln \sigma^2 +\frac{1}{\sigma^2}(\by-\bX\bbeta)^\top(\by-\bX\bbeta)  \right\} \nonumber. 
\end{equation}
Maximizing the likelihood is equivalent to maximizing the log-likelihood since the logarithm is a monotonically increasing function.
Thus, the MLE of $\widehatbbeta$ is obtained by solving
\begin{equation}
\widehatbbeta = \mathop{\arg\max}_{\bbeta}{ -(\by-\bX\widehatbbeta)^\top(\by-\bX\widehatbbeta) }=\mathop{\arg\min}_{\bbeta}{ (\by-\bX\widehatbbeta)^\top(\by-\bX\widehatbbeta) }, \nonumber
\end{equation}
where constant terms have been omitted.
To find the solution, we solve the following optimization problem:
$$
\begin{aligned}
\bzero&= \frac{\partial}{\partial \bbeta}(\by-\bX\widehatbbeta)^\top(\by-\bX\widehatbbeta)
\qquad \implies\qquad 
\widehatbbeta= (\bX^\top\bX)^{-1}\bX^\top\by. 
\end{aligned}
$$
Therefore, the MLE of $\bbeta$ is given by $\widehatbbeta=(\bX^\top\bX)^{-1}\bX^\top\by$,  which holds for any value of $\sigma^2$. This result coincides with the OLS estimator.

To find the MLE of the variance parameter $\sigma^2$, consider
$$
\begin{aligned}
\widehat{\sigma}^2 
& = \mathop{\arg\max}_{\sigma^2}\left\{ \mathop{\arg\max}_{\bbeta}{\ell(\bbeta, \sigma^2)}  \right\} 
=\mathop{\arg\max}_{\sigma^2}{ \ell(\widehatbbeta, \sigma^2) } \\
&= \mathop{\arg\max}_{\sigma^2}{-\frac{1}{2}\left\{ n\ln 2\pi + n\ln \sigma^2 +\frac{1}{\sigma^2}(\by-\bX\widehatbbeta)^\top(\by-\bX\widehatbbeta)  \right\}}.
\end{aligned}
$$
By taking the derivative with respect to $\sigma^2$ and setting it to zero, we obtain
$
\widehat{\sigma}^2 = \frac{1}{n}(\by-\bX\widehatbbeta)^\top(\by-\bX\widehatbbeta).
$
It can also be shown that the second partial derivative of the log-likelihood functions are negative, which completes the proof.
\end{proof}

Under Gaussian noise, the MLE of $\widehatbbeta$ is identical  to the OLS estimator because both are derived by minimizing the same sum of squared error $\normtwo{\by-\bX\bbeta}^2=(\by-\bX\bbeta)^\top(\by-\bX\bbeta)$. 
Therefore, in the following sections, we will not distinguish between the MLE and the OLS estimator of  $\bbeta$, and the two terms will be used interchangeably.

In Section~\ref{sec:dist_sse}, we will show that the maximum likelihood estimator   $\frac{1}{n}(\by-\bX\widehatbbeta)^\top(\by-\bX\widehatbbeta)$ of the variance parameter is a \textit{biased estimator} of $\sigma^2$ (Definition~\ref{defintion:biased_unbiased}). An unbiased estimator is given by $\frac{1}{n-p}(\by-\bX\widehatbbeta)^\top(\by-\bX\widehatbbeta)$.

\index{GLS}
\index{Generalized Gauss-Markov linear model}
\index{Gauss-Markov linear model}
\index{Gaussian noise}
\index{Gaussian disturbance}
\paragrapharrow{Gaussian Noise for GLS}
Previously, we consider the linear model $\rvy = \bX\bbeta + \bepsilon$, where $\bepsilon \sim \normal(\bzero, \sigma^2 \bI)$ and $\bX \in \real^{n\times p}$ is fixed and has full column rank with $n \geq p$, i.e., rank is $p$:
\begin{equation}\label{equation:plain_lm}
\rvy \sim \normal(\bX\bbeta, \sigma^2\bI).
\end{equation}
Now, consider a more general case in which the covariance matrix of the noise vector $\bepsilon$ is not diagonal:
\begin{equation}
\bepsilon \sim \normal(\bzero, \sigma^2 \bOmega)
\qquad\implies\qquad 
\rvy \sim \normal(\bX\bbeta, \sigma^2\bOmega).
\end{equation}
where the error covariance matrix $\bOmega$ is symmetric positive definite, and  $ \bOmega\neq \bI $. 
This is known as the \textit{generalized Gauss-Markov linear model} and corresponds to the \textbf{generalized least squares (GLS)} problem defined in \eqref{equation:gls_prob_loss}.

We can factorize  $\bOmega$ into its symmetric square roots $\bOmega = \bOmega^{1/2} \bOmega^{1/2}$ (Theorem~\ref{theorem:unique-factor-pd}). Let $\rvz \triangleq \bOmega^{-1/2}\rvy$, we have 
$$
\Exp[\rvz] =\bOmega^{-1/2} \bX\bbeta
\qquad\text{and}\qquad 
\Cov[\rvz] = \sigma^2\bI. 
$$
In other words, 
\begin{equation}\label{equation:plain_lm_glm}
\rvz\sim \normal(\bOmega^{-1/2}\bX\bbeta, \sigma^2\bI).
\end{equation}
Comparing the form of Equation~\eqref{equation:plain_lm} with the original model in  Equation~\eqref{equation:plain_lm_glm},
we observe that $\rvz$ now follows a standard linear model with i.i.d. Gaussian errors. This allows us to apply ordinary least squares or maximum likelihood estimation on $\rvz$, projecting onto the column space of $\bA\triangleq\bOmega^{-1/2}\bX$, to estimate $\bbeta$.
Using the result from Theorem~\ref{theorem:mle-gaussian}, the maximum likelihood estimators for $\bbeta$ and $\sigma^2$ can be expressed as:
$$
\begin{aligned}
\widehatbbeta&= (\bA^\top\bA)^{-1}\bA^\top\rvz =(\bX^\top  \bOmega^{-1} \bX)^{-1}\bX^\top\bOmega^{-1} \rvy; \\
\widehat{\sigma}^2 &= \frac{1}{n}(\rvz-\bA\widehatbbeta)^\top(\rvz-\bA\widehatbbeta) = \frac{1}{n}(\rvy-\bX\widehatbbeta)^\top \bOmega^{-1} (\rvy-\bX\widehatbbeta).
\end{aligned}
$$
Similarly, other estimators in Table~\ref{table:diff-estitors} can be obtained accordingly.

\index{GLS}
\begin{theoremHigh}[MLE for GLS under full Gaussian disturbance]\label{theorem:gauss_markov_gls}
Consider a Gauss-Markov linear model $ \rvy =\bX\bbeta + \bepsilon $ with $ \bX \in \real^{n\times p} $ of $\rank(\bX) = p$ and symmetric positive definite error covariance matrix $ \Cov[\bepsilon] = \sigma^2 \bOmega \in \real^{n \times n} $, which is known as the \textit{generalized Gauss-Markov linear model}. 
Then, the maximum likelihood estimate of $\bbeta$ is given by 
$$
\widehatbbeta = (\bX^\top  \bOmega^{-1} \bX)^{-1}\bX^\top\bOmega^{-1} \by
$$ 
for all values of $\sigma^2$. 
And the maximum likelihood estimate of $\sigma^2$ is $$
\widehat{\sigma}^2=\frac{1}{n}(\by-\bX\widehatbbeta)^\top \bOmega^{-1} (\by-\bX\widehatbbeta).
$$


\end{theoremHigh}

\index{Biased estimator}
\index{Unbiased estimator}
\index{Newton-Raphson iteration}
\index{Laplace noise}
\subsection{Laplace Noise}
The maximum likelihood estimation  framework can be applied to various types of noise models in the context of linear regression. While the Gaussian noise model is commonly used due to its mathematical convenience and desirable statistical properties, other noise models may be more appropriate depending on the nature of the data. 
One such alternative is the \textit{Laplace noise model}, which assumes zero-mean errors:
$$
\ry_i = \beta_0 +\beta_1x_{i1} + \beta_2 x_{i2} +\ldots +\beta_{p-1} x_{i,p-1} +\epsilon_i, \gapthree \forall\, i\in\{1,2,\ldots,n\},
$$
where $\epsilon_i\sim \laplacedist(0, b)$ for all $i\in\{1,2,\ldots,n\}$ (Definition~\ref{definition:laplace_distribution}).
Compared to the Gaussian distribution, the Laplace distribution has heavier tails, making it more robust to outliers or deviations from normality.
As with the Gaussian case, the likelihood function for the full dataset is the product of individual probability density functions:
\begin{equation}
\mathcalL(\bbeta, b) = \prod_{i=1}^{n}\frac{1}{2b} \exp\left\{-\frac{\abs{y_i-\bx_i^\top\bbeta}}{b}\right\}.
\end{equation}
Once more, we work with the log-likelihood function to simplify calculations:
\begin{equation}
\begin{aligned}
\ell(\bbeta, b) = \ln \mathcalL(\bbeta, b) 
&= \sum_{i=1}^{n}\left(-\ln(2b) -\frac{\abs{y_i-\bx_i^\top\bbeta}}{b}\right)
=-n\ln(2b) - \frac{\normone{\by-\bX\bbeta}}{b}.
\end{aligned}
\end{equation}
Therefore,  the maximum likelihood estimate of $\bbeta$ can be obtained by 
\begin{equation}
\widetilde{\bbeta} = \mathop{\arg\min}_{\bbeta} \normone{\by-\bX\bbeta}.
\end{equation}
To find the maximum likelihood estimators for both $\bbeta$ and $b$, we need to maximize the log-likelihood function over both parameters $(\bbeta, b)$.
However, unlike in the Gaussian case, there is generally no closed-form solution for these estimates. Numerical optimization techniques are therefore required to compute the MLEs.

\paragrapharrow{MLE for $\beta_0$.}
Taking the partial derivative of $\ell(\bbeta, b)$ with respect to $\beta_0$, and setting it to zero gives:
$$
\frac{\partial \ell(\bbeta, b)}{\partial \beta_0}  = \sum_{i=1}^{n}\frac{\sgn(y_i -\bx_i^\top\bbeta)}{b}=0,
$$
where $\sgn(\cdot)$ is the sign function, which takes the value $-1$ if the argument is negative, 0 if the argument is zero, and 1 if the argument is positive.

\paragrapharrow{MLE for $\beta_j$ with $j\in\{2,3,\ldots, p\}$.}
Taking the partial derivative of $\ell(\bbeta, b)$ with respect to $\beta_j$ for $j\in\{2,3,\ldots,p\}$, and setting it to zero:
$$
\frac{\partial \ell(\bbeta, b)}{\partial \beta_j}  = \sum_{i=1}^{n}\frac{\sgn(y_i -\bx_i^\top\bbeta) x_{ij}}{b}=0,
$$

\paragrapharrow{MLE for $b$.}
Taking the partial derivative of $\ell(\bbeta, b)$ with respect to $\beta_0$, and setting it to zero:
$$
\frac{\partial \ell(\bbeta, b)}{\partial b}  = \sum_{i=1}^{n}\left(-\frac{1}{b}+\frac{\abs{y_i -\bx_i^\top\bbeta}}{b^2}\right)=0.
$$
That is, 
$$
b = \frac{1}{n}\sum_{i=1}^{n}\abs{y_i -\bx_i^\top\bbeta}.
$$
Therefore, the maximum likelihood estimator for the scale parameter $b$ is simply the average absolute deviation between the observed responses and the predicted responses.

Finding analytical solutions for the MLEs under Laplace-distributed noise is generally more complex than in the Gaussian case. As a result, numerical optimization techniques such as Newton-Raphson iteration, iteratively reweighted least squares (IRLS), or gradient descent are commonly used in practice. See Chapter~\ref{chapter:glm} for more details.



\subsection{Best Linear Unbiased Estimator (BLUE)}\label{section:beta-blue}

Under the assumption of moment conditions on the noise, as opposed to specifying its full distribution (i.e., $\bepsilon\sim\normal(\bzero, \sigma^2\bI)$), we demonstrate that the OLS estimator exhibits a smaller covariance matrix  than any other linear unbiased estimator.  
Using the bias-variance decomposition (Lemma~\ref{lemma:bias-variance}), we find that the bias component is zero for all unbiased estimators. Therefore, the estimator with the smallest covariance matrix becomes the optimal one in terms of mean squared error.

\index{BLUE}
\index{Gauss-Markov}
\index{Best linear unbiased estimator}
\begin{theoremHigh}[Gauss-Markov]\label{theorem:gauss-markov}
Let $\rvy = \bX \bbeta +\bepsilon$ and assume that:
\begin{enumerate}[(i)]
\item $\bX\in \real^{n\times p}$ is fixed and has full column rank with $n\geq p$ (that is, rank is $p$), so that $\bX^\top \bX$ is invertible.

\item $\Exp [\bepsilon \mid  \bX]=\bzero$.

\item $\Cov [\bepsilon \mid  \bX] = \sigma^2 \bI$, where $\bI$ is the $n\times n$ identity matrix, and $\sigma^2$ is a positive constant.
\end{enumerate}
Note that we do not apply the stronger assumption that the noises are i.i.d. from a Gaussian distribution. Then, the OLS estimator $\widehatbbeta = (\bX^\top\bX)^{-1}\bX^\top \rvy$ is the best linear unbiased estimator (BLUE) of $\bbeta$ (stated in the following). That is, for any \textbf{linear unbiased estimator} $\widetilde{\bbeta}$ of $\bbeta$, it holds that
$$
\Cov[\widetilde{\bbeta} \mid \bX] - \Cov[\widehatbbeta\mid \bX] \succeq \bzero.
$$
\end{theoremHigh}

\begin{proof}[of Theorem \ref{theorem:gauss-markov}]
Let $\widetilde{\bbeta}$ be any linear estimator of $\bbeta$, which could be expressed as $\widetilde{\bbeta} = \bQ \rvy$. 
Specifically, in the case of OLS, we have $\bQ = (\bX^\top\bX)^{-1}\bX^\top$. Using the unbiased assumption, we get 
$$
\begin{aligned}
\bbeta &= \Exp[\widetilde{\bbeta} \mid \bX] = \Exp[\bQ \rvy\mid \bX] = \Exp[\bQ \bX \bbeta + \bQ \bepsilon \mid \bX]
=\bQ \bX\bbeta,
\end{aligned} 
$$
which yields $(\bQ\bX -\bI)\bbeta = \bzero $. We conclude that the null space of $(\bQ\bX -\bI)$ is the entire space $\real^p$, and thus, $\bQ \bX = \bI$. The variance is then
$$
\begin{aligned}
\Cov[\bQ\rvy \mid \bX] &= \bQ \Cov[\rvy \mid \bX] \bQ^\top = \bQ \Cov[\bepsilon \mid \bX] \bQ^\top
= \bQ (\sigma^2 \bI) \bQ,
\end{aligned}
$$
where the first equality follows from the fact of covariance under linear transformation by $\Cov[\bQ \rvy] = \bQ \Cov[\rvy]\bQ^\top$.
Then, it follows that 
$$
\begin{aligned}
&\gap \Cov[\widetilde{\bbeta} \mid \bX] - \Cov[\widehatbbeta \mid \bX]
=\bQ (\sigma^2 \bI) \bQ - \sigma^2(\bX^\top\bX)^{-1} \\
&\stackrel{\dag}{=} \sigma^2\left[\bQ \bQ^\top - \bQ\bX(\bX^\top\bX)^{-1}\bX^\top\bQ^\top\right]  
= \sigma^2\bQ(\bI - \bH)\bQ^\top \\
&= \sigma^2\bQ(\bI - \bH)(\bI - \bH)^\top \bQ^\top 
\succeq \bzero, 
\end{aligned}
$$
where the equality ($\dag$) follows from the fact that $\bQ\bX=\bI$, and  the last equality follows from the fact that both $\bH$ and $\bI -\bH$ are idempotent. 
Note that $\Cov[\widehatbbeta \mid \bX]=\sigma^2(\bX^\top\bX)^{-1}$ will be shown in Theorem~\ref{theorem:samplding_dist_lse_gaussian_mom}. This completes the proof.
\end{proof}

Since there is a huge variety of candidate distributions for $\bepsilon$ that would be compatible with the property $\Cov[\bepsilon \mid \bX]=\sigma^2\bI$, we cannot say very much about the exact distribution of $\widehatbbeta$ or $\widehatbbeta-\bbeta$. However, under mild regularity  conditions, the distribution of $\widehatbbeta$ is asymptotically normal for large sample sizes; see Theorem~\ref{theorem:larg-sample-hat-beta}:
\begin{equation}\label{equation:large_asy_tmp}
\text{For large $n$, it follows that $\widehatbbeta \sim \normal(\bbeta, \sigma^2 (\bX^\top\bX)^{-1})$}.
\end{equation}
This asymptotic distribution is identical to the one obtained when $\bepsilon$ is normally distributed (as shown in Theorem~\ref{theorem:samplding_dist_lse_gaussian}), demonstrating that OLS enjoys favorable statistical properties even without assuming normality.

The  theorem  shows that the OLS estimator achieves the smallest variance compared to other linear unbiased estimators. 
Subsequently, in the bias-variance decomposition (Lemma~\ref{lemma:bias-variance}),  we show that the mean squared error between the estimator of $\widehatbbeta$ and the true parameter $\bbeta$ is a sum of a bias term and a variance term. 
Since OLS is unbiased, its MSE equals its variance. Among all  linear unbiased estimators, OLS has the smallest variance, making it optimal in terms of MSE as well; that is, OLS estimator is the \textit{best linear unbiased estimator (BLUE)}.

While we have established that the OLS estimator is the best linear unbiased estimator  available ``on hand", one might question whether there exists an imaginary or theoretical estimator with less variance  (not necessarily unbiased). This will be addressed in the next section.

\index{Minimum variance unbiased estimator}
\subsection{Minimum Variance Unbiased (MVU) Estimator}\label{section:mvu_esti}

In parameter estimation problems, we obtain information about an unknown parameter from a sample of data drawn from an underlying probability distribution. A natural question arises: How much information can a given sample provide about the unknown parameter? This section introduces a measure of such information. We will also see that this information measure can be used to establish bounds on the variance of estimators and to approximate the sampling distribution of an estimator based on a large sample; see Theorem~\ref{theorem:asymp_of_mle}.

\index{Score function}
\index{Fisher information}
\subsection*{Fisher Information}
For brevity, we consider a random variable $\rx$ for which the p.d.f. or p.m.f. is $p(x\mid \theta)$, where $\theta$ is an unknown parameter belonging to a parameter space $\Theta$.
Intuitively, if an event has a  small probability, its occurrence provides significant information. 
For a random variable $\rx \sim p(x\mid \theta)$, if $\theta$ were the true value of the parameter, the likelihood function should be large, or equivalently, the derivative log-likelihood function should be close to zero. This is the fundamental idea behind maximum likelihood estimation; see Section~\ref{section:mle_method}.
We define $\ell(\theta; x) \triangleq \ln p(x\mid \theta)$ as the log-likelihood function w.r.t. $\theta$, and it follows that 
\begin{equation}
\ell'(\theta; x) = \frac{\partial}{\partial \theta} \ln p(x\mid \theta) = \frac{p'(x\mid \theta)}{p(x\mid \theta)},
\end{equation}
where $p'(x\mid \theta)$ denotes the derivative of $p(x\mid \theta)$ with respect to $\theta$. 
Note that $\ell'(\theta; \rx)$, written with normal font $\rx$, denotes a random variable; whereas $\ell'(\theta; x)$, written with an italic font $x$, denotes a particular realization of that variable.
Similarly, we denote the second order derivative of $p(x\mid \theta)$ with respect to $\theta$ as $p''(x\mid \theta)$.

\paragrapharrow{Fisher information for a single random variable.}
From the above discussion, if $\ell'(\theta; \rx)$ is close to zero, the observed value is expected, and thus does not convey much information about $\theta$.
On the other hand, when the absolute values $\abs{\ell'(\theta; \rx)}$ or $\abs{\ell''(\theta; \rx)}$ are large, the random variable provides substantial information about $\theta$. 
Therefore, we can use $[\ell'(\theta; \rx)]^2$ to measure the amount of information provided by $\rx$. However, since $\rx$ itself is a random variable, we should consider the average case. Thus, we introduce the following definition:

\begin{definition}[Fisher information (for $\theta$)]\label{definition:fish_inf_theta}
The \textit{Fisher information} contained in the random variable $\rx$ discussed above is defined as:
\begin{equation}
\text{(F1)}:\qquad 
\colorbox{gray!10}{
	$
\fisher(\theta) 
= \Exp_{\theta} \left[\big(\ell'(\theta; \rx)\big)^2\right] $
}
= \int [\ell'(\theta; x)]^2 p(x\mid \theta) \,dx.
\end{equation}
\end{definition}
Assuming we can interchange differentiation and integration, we obtain:
$$
\begin{aligned}
\int p'(x\mid \theta)\,dx &= \frac{\partial}{\partial \theta} \int p(x\mid \theta)\,dx = 0;\\
\int p''(x\mid \theta)\,dx &= \frac{\partial^2}{\partial \theta^2} \int p(x\mid \theta)\,dx = 0.
\end{aligned}
$$
It then follows that:
$$
\Exp_{\theta}[\ell'(\theta; \rx)] = \int \ell'(\theta; x)p(x\mid \theta)\,dx = \int \frac{p'(x\mid \theta)}{p(x\mid \theta)} p(x\mid \theta)\,dx = \int p'(x\mid \theta)\,dx = 0.
$$
Hence, the definition of Fisher information in Definition~\ref{definition:fish_inf_theta} can be equivalently expressed as
\begin{equation}
\text{(F2)}:\qquad 	
\colorbox{gray!10}{
$\fisher(\theta) = \Var_{\theta} \left[ \ell'(\theta; \rx) \right]$
}
.
\end{equation}
Moreover, observe that:
\begin{equation}
\ell''(\theta; x) = \frac{\partial}{\partial \theta} \left[ \frac{p'(x\mid \theta)}{p(x\mid \theta)} \right] = \frac{p''(x\mid \theta)p(x\mid \theta) - [p'(x\mid \theta)]^2}{[p(x\mid \theta)]^2} = \frac{p''(x\mid \theta)}{p(x\mid \theta)} - [\ell'(\theta; x)]^2.
\end{equation}
Taking expectations:
$$
\begin{aligned}
\Exp_{\theta}[\ell''(\theta; \rx)] 
&= \int \left[ \frac{p''(x\mid \theta)}{p(x\mid \theta)} - [\ell'(\theta; x)]^2 \right] p(x\mid \theta)\,dx \\
&= \int p''(x\mid \theta)\,dx - \Exp_{\theta} \big[\big(\ell'(\theta; \rx)\big)^2\big]
= -\fisher(\theta).
\end{aligned}
$$
This leads to another equivalent expression for the Fisher information:
\begin{equation}
\text{(F3)}:\qquad 		
\colorbox{gray!10}{
$\fisher(\theta) = -\Exp_{\theta}[\ell''(\theta; \rx)]$
}
 = -\int \left[ \frac{\partial^2}{\partial \theta^2} \ln p(x\mid \theta) \right] p(x\mid \theta)\,dx.
\end{equation}

\paragrapharrow{Fisher information for multiple samples.}
Now suppose that we are given a random sample $\rx_1, \rx_2, \ldots, \rx_n$ drawn from a distribution with the p.d.f. or p.m.f.  $p(x\mid \theta)$, where the value of the parameter $\theta$ is unknown. 
We now aim to calculate how much information this random sample provides about the parameter $\theta$.
We  denote the joint p.d.f. of $\rx_1, \rx_2, \ldots, \rx_n$ as
$$
p_n(\bx \mid \theta) = \prod_{i=1}^{n} p(x_i\mid \theta).
$$
The corresponding joint log-likelihood function and its derivative (with respect to $\theta$) are given by:
\begin{align}
\ell_n(\theta; \bx) &= \ln p_n(\bx\mid \theta) = \sum_{i=1}^{n} \ln p(x_i\mid\theta) = \sum_{i=1}^{n} \ell(\theta; x_i);\\
\ell'_n(\theta; \bx) &= \frac{p'_n(\bx\mid \theta)}{p_n(\bx\mid \theta)} =  \sum_{i=1}^{n} \ell'(\theta; x_i). \label{equation:fis_mul_grad}
\end{align}

Analogous to the Fisher information defined for a single observation $\rx$ in Definition~\ref{definition:fish_inf_theta}, 
we define the Fisher information $\fisher_n(\theta)$ contained in the random sample $\rx_1, \rx_2, \ldots, \rx_n$ as
$$
(\text{F1}):\qquad 
\colorbox{gray!10}{$\fisher_n(\theta) = \Exp_{\theta} \left[ \big(\ell'_n(\theta; \rvx)\big)^2\right] $}
= \int \ldots \int [\ell'_n(\theta; \bx)]^2 p_n(\bx\mid \theta) dx_1 \ldots dx_n.
$$
which is an $n$-dimensional integral. Assuming that differentiation and integration can be interchanged, we obtain:
$$
\begin{aligned}
\int p'_n(\bx\mid \theta) d\bx  &= \frac{\partial}{\partial \theta} \int p_n(\bx\mid \theta) d\bx = 0;\\
\int p''_n(\bx\mid \theta) d\bx &= \frac{\partial^2}{\partial \theta^2} \int p_n(\bx\mid \theta) d\bx = 0.
\end{aligned}
$$
It then follows that:
\begin{equation}\label{equation:fis_mul_grazero}
\Exp_{\theta}[\ell'_n(\theta; \rvx)] = \int \ell'_n(\theta; \bx) p_n(\bx\mid \theta) d\bx = \int \frac{p'_n(\bx\mid \theta)}{p_n(\bx\mid \theta)} p_n(\bx\mid \theta) d\bx = \int p'_n(\bx\mid \theta) d\bx = 0.
\end{equation}
Therefore, the Fisher information for the sample $\rx_1, \rx_2, \ldots, \rx_n$ can be expressed as:
$$
\begin{aligned}
(\text{F2 \& F3}):\qquad 
\colorbox{gray!10}{$\fisher_n(\theta) = \Var_{\theta} \left[ \ell'_n(\theta; \rvx) \right]
= -\Exp_{\theta} \left[ \ell''_n(\theta; \rvx) \right]$
}.
\end{aligned}
$$
From the definition of $\ell_n(\theta; \bx)$, it follows that
$\ell''_n(\theta; \bx) = \sum_{i=1}^{n} \ell''(\theta; x_i)$,
whence we have
$$
(\text{F4}):\qquad 
\fisher_n(\theta) = -\Exp_{\theta} \left[ \ell''_n(\theta; \rvx) \right] 
= -\Exp_{\theta} \left[ \sum_{i=1}^{n} \ell''(\theta; \rx_i) \right] 
= -\sum_{i=1}^{n} \Exp_{\theta} \left[ \ell''(\theta; \rx_i) \right] = n \fisher(\theta).
$$
In other words, the Fisher information contained in a random sample of size $n$ is simply $n$ times the Fisher information obtained from a single observation.

\paragrapharrow{Fisher information for multiple parameters.}
Now suppose the distribution model involves more than one parameter. That is, consider a random variable $\rx \sim p(x\mid \btheta)$, where $\btheta = [\theta_1, \theta_1, \ldots, \theta_p]^\top$  is a vector of unknown parameters. 
We denote the log-likelihood function as
$
\ell(\btheta) = \ln p(x\mid \btheta).
$
The first-order derivative (gradient) of $\ell(\btheta)$ with respect to $\btheta$ is a $p$-dimensional vector, known as the \textit{score function}, given by
$$
\frac{\partial \ell(\btheta)}{\partial \btheta} 
= \left[\frac{\partial \ell(\btheta)}{\partial \theta_1}, \ldots, \frac{\partial \ell(\btheta)}{\partial \theta_p}\right]^\top,
$$
The second-order derivative (Hessian) of $\ell(\btheta)$ with respect to $\btheta$ is a $p \times p$ matrix defined as 
$$
\frac{\partial^2 \ell(\btheta)}{\partial \btheta^2} = \left[ \frac{\partial^2 \ell(\btheta)}{\partial \theta_i \partial \theta_j} \right], \quad \forall\, {i=1,2,\ldots,p; j=1,2,\ldots,p}.
$$
We define the Fisher information matrix as
\begin{equation}\label{equation:fish_mul_defi}
\colorbox{gray!10}{
$\fisher(\btheta) = \Exp \left[ \frac{\partial \ell(\btheta)}{\partial \btheta} \left( \frac{\partial \ell(\btheta)}{\partial \btheta} \right)^\top \right] = \Cov \left[ \frac{\partial \ell(\btheta)}{\partial \btheta} \right] = -\Exp \left[ \frac{\partial^2 \ell(\btheta)}{\partial \btheta^2} \right]$
}.
\end{equation}
Since the covariance matrix is symmetric and positive semi-definite, these properties hold for the Fisher information matrix as well.
The Fisher information for $n$ samples is similar given by $\fisher_n(\btheta) = n\fisher(\btheta)$.

\begin{example}[Fisher information for normal distribution]
Consider  a Gaussian distribution $\normal(\mu, \sigma^2)$, we have
$$
\btheta = [\mu, \sigma^2]^\top
\qquad \text{and}\qquad 
\ell(\btheta) = -\frac{1}{2} \ln(2\pi \sigma^2) - \frac{(x-\mu)^2}{2\sigma^2}.
$$
Therefore, the gradient and Hessian are
$$
\begin{aligned}
\frac{\partial \ell(\btheta)}{\partial \btheta} 
&= \left[\frac{\partial \ell(\btheta)}{\partial \mu}, \frac{\partial \ell(\btheta)}{\partial \sigma^2}\right]^\top 
= \left[\frac{x-\mu}{\sigma^2}, -\frac{1}{2\sigma^2} + \frac{(x-\mu)^2}{2(\sigma^2)^2}\right]^\top;\\
\frac{\partial^2 \ell(\btheta)}{\partial \btheta^2} 
&= \begin{bmatrix}
-\frac{1}{\sigma^2} & -\frac{x-\mu}{(\sigma^2)^2} \\
-\frac{x-\mu}{(\sigma^2)^2} & \frac{1}{2(\sigma^2)^2} - \frac{(x-\mu)^2}{(\sigma^2)^3}
\end{bmatrix}.
\end{aligned}
$$
For $\rx \sim \normal(\mu, \sigma^2)$, since $\Exp[\rx - \mu] = 0$ and $\Exp[(\rx - \mu)^2] = \sigma^2$, we can easily get the Fisher information matrix as
$
\fisher(\btheta) 
= -\Exp \left[ \frac{\partial^2 \ell(\btheta)}{\partial \btheta^2} \right] 
= \begin{bmatrixfoot}
	\frac{1}{\sigma^2} & 0 \\
	0 & \frac{1}{2\sigma^4}
\end{bmatrixfoot}.
$
\end{example}

\subsection*{Cram\'er-Rao Lower Bound (CRLB)}
Suppose that we have a random sample $\rx_1, \rx_2, \ldots, \rx_n$ drawn from a distribution with the p.d.f. or p.m.f. given by $p(x \mid \theta)$, where the value of the parameter $\theta$ is unknown. We will demonstrate  how to use Fisher information to determine the lower bound for the variance of an estimator of $\theta$.

Let $\widehat{\theta} = T(\rx_1, \rx_2, \ldots, \rx_n) = T(\rvx)$ be an arbitrary estimator of $\theta$. Assume that  $\Exp_{\theta}[\widehat{\theta}] = m(\theta)$, and that the variance of $\widehat{\theta}$ is finite. 
Consider the random variable $\ell'_n(\rvx\mid\theta)$ defined in \eqref{equation:fis_mul_grad}, it was shown in \eqref{equation:fis_mul_grazero} that $\Exp_{\theta}[\ell'_n(\rvx\mid\theta)] = 0$. Therefore, the covariance between $\widehat{\theta}$ and $\ell'_n(\theta; \rvx)$ is
$$
\small
\begin{aligned}
\Cov_{\theta}[\widehat{\theta}, \ell'_n(\theta; \rvx)] 
&= \Exp_{\theta} 
\left[
\big(\widehat{\theta} - \Exp_{\theta}[\widehat{\theta}]\big) 
\Big(\ell'_n(\rvx\mid\theta) - \Exp_{\theta}[\ell'_n(\rvx\mid\theta)]\Big)
\right] 
= \Exp_{\theta} \left[ \big(\widehat{\theta} - m(\theta)\big) \ell'_n(\theta; \rvx) \right]\\
&= \Exp_{\theta} \left[\big(T(\rvx) - m(\theta)\big)\ell'_n(\theta; \rvx)  \right]  
=\Exp_{\theta} \left[T(\rvx)\ell'_n(\theta; \rvx)  \right]\\
&= \int \ldots \int T(\bx) \ell'_n(\theta; \bx) p_n(\bx\mid \theta) dx_1 \ldots dx_n 
\stackrel{\dag}{=} \int \ldots \int T(\bx) p'_n(\bx\mid \theta) dx_1 \ldots dx_n  \\
&= \frac{\partial}{\partial \theta} \int \ldots \int T(\bx) p_n(\bx\mid \theta) dx_1 \ldots dx_n 
= \frac{\partial}{\partial \theta} \Exp_{\theta}[\widehat{\theta}] = m'(\theta),
\end{aligned}
$$
where the equality ($\dag$) follows from \eqref{equation:fis_mul_grad}.
By the Cauchy-Schwartz inequality (Lemma~\ref{lemma:cs_ineq_exp}) and the definition of $\fisher_n(\theta)$, we obtain:
$$
\begin{aligned}
[m'(\theta)]^2
&= \left\{ \Cov_{\theta}[\widehat{\theta}, \ell'_n(\theta; \rvx)] \right\}^2 
= \Exp_{\theta} \left[ \big(\widehat{\theta} - m(\theta)\big) \ell'_n(\theta; \rvx) \right]\\
&\leq \Var_{\theta}[\widehat{\theta}] \Var_{\theta}[\ell'_n(\theta; \rvx)] = \Var_{\theta}[\widehat{\theta}] \fisher_n(\theta)
= n \fisher(\theta) \Var_{\theta}[\widehat{\theta}].
\end{aligned}
$$
Thus, we arrive at the following lower bound on the variance of any estimator $\widehat{\theta}$:
\begin{equation}
	\Var_{\theta}[\widehat{\theta}] \geq \frac{[m'(\theta)]^2}{n \fisher(\theta)}.
\end{equation}
If we denote $b(\theta) = \Exp[T] - \theta$ as the bias of the estimator $T$, the inequality can be equivalently denoted as 
\begin{equation}
\Var_{\theta}[\widehat{\theta}] \geq \frac{[m'(\theta)]^2}{n \fisher(\theta)} \equiv \frac{[b'(\theta)+1]^2}{n \fisher(\theta)}.
\end{equation}
This is is called the the \textit{Cram\'er-Rao lower bound} or the \textit{information inequality}, in honor of the Swedish statistician H. Cram\'er and Indian statistician C. R. Rao who independently developed this inequality during the 1940s. The information inequality shows that as $\fisher(\theta)$ increases, the variance of the estimator decreases,  implying higher quality of estimation; that is why the quantity $\fisher(\theta)$ is referred to as ``information."

If $\widehat{\theta}$ is an unbiased estimator, then $b(\theta) =0$, $b'(\theta) = 0$. Hence, by the information inequality, for an unbiased estimator $\widehat{\theta}$, $\Var_{\theta}[\widehat{\theta}] \geq \frac{1}{n \fisher(\theta)}$.
Under certain regularity  conditions, no other unbiased estimator of the parameter $\theta$ based on an i.i.d. sample of size $n$ can have a variance smaller than CRLB.
Formally, we formulate the result in the following theorem.
\index{Cram\'er-Rao lower bound}
\begin{theoremHigh}[Cram\'er-Rao lower bound (CRLB)]\label{theorem:crlb}
Let $x_1, x_2, \ldots, x_N$ be an i.i.d. sample from a regular parametric model $p(\cdot \mid\theta)$ (i.e., the model cannot switch between continuous and discrete depending on the value of $\theta$), $\Theta \in \real $. Let $T: \mathcal{X}^N \rightarrow \Theta$ be an estimator of $\theta$, for all $N$. Assume that:
\begin{enumerate}
\item $\Var[T] < \infty$, for all $\theta \in \Theta$;

\item $\frac{\partial}{\partial \theta}\left[\int_{\mathcal{X}^N} p_{\rx_1, \ldots, \rx_N}(x_1, \ldots, x_N;\theta)dx \right]= \int_{\mathcal{X}^N} \frac{\partial}{\partial \theta} p_{\rx_1, \ldots, \rx_N}(x_1, \ldots, x_N; \theta)dx$;

\item  $\frac{\partial}{\partial \theta}\left[\int_{\mathcal{X}^N} T(x_1,\ldots,x_N) p_{\rx_1, \ldots, \rx_N}(x_1, \ldots, x_N;\theta)dx \right]$ = 

$ \qquad \qquad \qquad\; \int_{\mathcal{X}^N} T(x_1,\ldots,x_N) \frac{\partial}{\partial \theta} p_{\rx_1, \ldots, \rx_N}(x_1, \ldots, x_N; \theta)dx$. 
\end{enumerate}
If we denote the bias of $T$ by $b(\theta) = \Exp[T(\rx_1, \rx_2, \ldots, \rx_N)] - \theta$, then it holds that $b(\theta)$ is differentiable, and 
\begin{equation}
\Var[T(\rx_1, \rx_2, \ldots, \rx_N)] \geq  \frac{\left(b' (\theta )+1\right)^2}{N \cdot \Exp \left[\frac{\partial}{\partial\theta} \ln p(\rx_1\mid  \theta)\right]^2}. \nonumber
\end{equation}
That is, the variance has a lower bound.
\end{theoremHigh}

According to the Gauss-Markov theorem, the OLS estimator stands as the best linear unbiased estimator attainable. 
Yet, the gap between this estimator and the theoretical limit of a linear estimator remains unclear.  
Specifically, we seek insights into the minimum variance achievable for {linear estimators} and the proximity of the OLS variance to this theoretical limit.
This question is answered by CRLB.
Note that for this discussion, we assume the additional condition that the noise follows a Gaussian distribution.

By repeating the realization of $\rvy$ from $\bX$ for $N$ times with the same parameters, we find the variance of the OLS estimator attains the bound of CRLB for $\bbeta$. We state the conclusion in the following theorem.

\index{MVU}
\index{Minimum variance unbiased estimator}
\begin{theoremHigh}[Minimum variance unbiased (MVU) estimator]\label{theorem:least-blue-1d}
Let $\rvy = \bX\bbeta + \bepsilon$, where $\bepsilon \sim \normal(\bzero, \sigma^2 \bI)$. 
And assume that $\bX\in \real^{n\times p}$ is fixed and has full rank with $n\geq p$ (i.e., rank is $p$ so that $\bX^\top \bX$ is invertible).
Then, the OLS estimator $\widehatbbeta = (\bX^\top\bX)^{-1}\bX^\top \rvy$ attains the bound of CRLB for $\bbeta$.

\end{theoremHigh}

\begin{proof}[of Theorem~\ref{theorem:least-blue-1d}]
For simplicity, we only prove the one-dimensional case. Interesting readers can replicate  the process to find the proof for the high-dimensional CRLB. 
Referring again to  Equation~\eqref{equation:likelihood-of-gaussiannoise}, the likelihood of this model under $\bbeta$ and $\sigma^2$ is 
\begin{equation}
\begin{aligned}
p(\by\mid \bX,\bbeta)	 &=\normal(\bX\bbeta, \sigma^2\bI)  
= \frac{1}{(2\pi\sigma^2)^{n/2}} \exp\left\{-\frac{1}{2\sigma^2} (\by-\bX\bbeta)^\top(\by-\bX\bbeta)\right\}, \\
\ln p(\by\mid \bX,\bbeta) &= -\frac{n}{2} \ln(2\pi \sigma^2) - \frac{1}{2\sigma^2}(\by^\top\by-2\by^\top\bX\bbeta+\bbeta^\top\bX^\top\bX\bbeta),\\
\frac{\partial \ln p(\by\mid \bX,\bbeta)}{\partial \bbeta} &=\frac{1}{\sigma^2}(\bX^\top\by - \bX^\top\bX\bbeta).
\nonumber
\end{aligned}	
\end{equation}
For one-dimensional inputs $\bX=\bx$, we have 
\begin{equation}
\begin{aligned}
\frac{\partial \ln p(\rvy\mid \bx,\beta)}{\partial \beta} &=\frac{1}{\sigma^2}(\bx^\top\rvy - \bx^\top\bx\beta), \\
\left[\frac{\partial \ln p(\rvy\mid \bx,\beta)}{\partial \beta}\right]^2 &= \frac{1}{\sigma^2} \big(\bx^\top\rvy\rvy^\top\bx - 2\beta\rvy^\top\bx\bx^\top\bx + \beta^2 \bx^\top\bx\bx^\top\bx\big), \\
\Exp\left[\frac{\partial \ln p(\rvy\mid \bx,\beta)}{\partial \beta}\right]^2 &=\frac{\bx^\top\bx}{\sigma^2},
\nonumber
\end{aligned}	
\end{equation}
where the last equation follows from the fact that 
$\Cov[\rvy, \rvy] = \sigma^2\bI = \Exp[\rvy\rvy^\top] - \beta^2\bx\bx^\top. $


Suppose now we have $N$ realizations of $\by_i$ from the same and fixed input matrix $\bX$:
$$
\rvy_i = g(\bX)+\epsilon_i, \gap \forall i\in \{1,2,\ldots, N\}.
$$
Note the difference between $N$ and $n$. For $n$, it means we have $n$ fixed samples of $\bx_j$ on hand (i.e., $\bX\in \real^{n\times p}$). For $N$, we use the same fixed $n$ samples to realize $N$ different outputs $\by_i$ (since we assume Gaussian noise disturbance that gives rises to the likelihood). 
In other words, for one-dimensional case, we have $N$ such $\by_i$'s, and each $\by_i \in \real^n$, $\forall$ $i\in$ $\{1, 2, \ldots, N\}$. 

So the variance has the relationship $\Var[\widehatbbeta] \geq \frac{\sigma^2}{N (\bx^\top\bx)}$, as the bias term $b' (\beta )=0$ for an unbiased estimator.
And since we have proved in Theorem~\ref{theorem:samplding_dist_lse_gaussian}, $\Var[\widehat{\beta}_i] =\frac{\sigma^2}{ (\bx^\top\bx)} $. Then, repeating $N$ times results in $\Var[\widehatbbeta] =\frac{\sigma^2}{N (\bx^\top\bx)} $. This completes the proof.
\end{proof}

It follows that, particularly for large sample sizes $N$, the OLS estimator for $\bbeta$ achieves performance equivalent to the theoretical optimum. This once again explains why the OLS method is fundamental to linear model estimation.

\index{Distribution theory}
\section{Distribution Theory for OLS}


We introduce distributional theory relevant to least squares estimation under Gaussian noise. In particular, we aim to study the sampling distribution of estimators related to the least squares model. This allows us to understand their precision, construct confidence intervals, and perform hypothesis testing.

When applying the least squares estimation or maximum likelihood estimator of $\widehatbbeta = (\bX^\top\bX)^{-1}\bX^\top \rvy$ (Theorem~\ref{theorem:mle-gaussian}), we can directly compute the expectation of the estimator as $\Exp[\widehatbbeta] = \Exp[ (\bX^\top\bX)^{-1}\bX^\top \rvy]$ (details provided below).
Under the least squares probability model, we also assume that $\rvy = \bX \bbeta +\bepsilon$, where $\bepsilon$ follows some probability distribution. 
Substituting into the estimator of $\bbeta$, therefore, we have
\begin{equation}
\widehatbbeta = (\bX^\top\bX)^{-1}\bX^\top (\bX \bbeta +\bepsilon). \nonumber
\end{equation}
The distribution theory of least squares revolves around this expression.

\subsection{Mathematical Notations}
In statistical modeling, since the sample is all the information we have, any analysis we perform must be based on it---that is, as a function of the observed data, say $T(\rx_1, \rx_2, \ldots, \rx_n)$. Such a function is called a \textit{statistic} or an estimator we have termed previously.

\begin{definition}[Statistic]
Let $\mathcalX$ be a sample space. For $n \geq 1$, a statistic is a function $T: \mathcalX^n \to \real$.
\end{definition}

Notice that the function $T$ must not depend on the unknown parameter $\theta$, since we do not know the latter. If the function $T$ depends  on $\theta$, then it cannot be considered a statistic.
In general, a statistic contains less information about $\theta$ than the full dataset $(\rx_1, \rx_2, \ldots, \rx_n)$. However, for certain models, we can find a statistic $T$ such that $T(\rx_1, \rx_2, \ldots, \rx_n)$ retains all the relevant information about $\theta$ contained in the original data. Such a statistic is called a \textit{sufficient statistic}, because it suffices to work with $T(\cdot)$ instead of the full dataset.

\begin{definition}[Sufficiency]\label{definition:sufficiency}
Let $\rx_1, \rx_2, \ldots, \rx_n \stackrel{i.i.d.}{\sim} f_{\theta}$. A statistic $T: \mathcalX^n \to \real$ is called \textit{sufficient} for the parameter $\theta$, if the conditional probability $\Pr[\rx_1 \leq x_1, \ldots, \rx_n \leq x_n \mid  T = t]$ does not depend on $\theta$, for all $[x_1, \ldots, x_n]^\top \in \real^n$ and all $t \in \real$.
\end{definition}

Intuitively, once we know the value of $T(\rx_1, \rx_2, \ldots, \rx_n)$, the conditional distribution of the data no longer provides any additional information about $\theta$. Therefore, knowing the full dataset beyond the value of $T$ does not help us determine which value of 
$\theta$ generated the data.
This definition is often difficult to verify directly. However, the following equivalent condition is usually easier to work with in practice.

\begin{definition}[Sampling distribution]
Let $\rx_1, \rx_2, \ldots, \rx_n \stackrel{i.i.d.}{\sim} F$, and let $T: \mathcalX^n \to \real$ be a statistic. The sampling distribution of $T$ under the distribution $F$ is defined as the probability distribution
$$
F_T(t) = \Pr[T(\rx_1, \rx_2, \ldots, \rx_n) \leq t], \quad t \in \real. 
$$
\end{definition}

We always apply statistics to samples, so we often suppress the dependence of the statistic on the sample values. That is, we write simply $T$ instead of $T(\rx_1, \rx_2, \ldots, \rx_n)$.
Using this notation, the sampling distribution becomes $F_T(t) = \Pr[T \leq t]$.
For multi-dimensional parameters $\btheta$, the concepts of sufficiency and sampling distributions are defined analogously.

\begin{exercise}
Let $\rx_1, \rx_2, \ldots, \rx_n \stackrel{i.i.d.}{\sim} \uniformdist(0, \theta)$ (see Exercise~\ref{exercise:uniform_dist}). Show that $T(\rx_1, \rx_2, \ldots, \rx_n) = \max\{\rx_1, \rx_2, \ldots, \rx_n\}$ is a sufficient statistic for $\theta$, and derive its sampling distribution.
\end{exercise}
\begin{exercise}
Let $\rx_1, \rx_2, \ldots, \rx_n \stackrel{i.i.d.}{\sim} \poissondist(\lambda)$ (Definition~\ref{definition:poisson_distribution}). Show that the statistic $T(\rx_1, \rx_2, \ldots, \rx_n) = \sum_{i=1}^{n} \rx_i$ is a sufficient statistic for $\lambda$, and derive its sampling distribution.
\end{exercise}

\index{Unbiased estimator}
\index{Unbiasedness}
\subsection{Unbiasedness under Moment Assumption}\label{section:unbiasedness-moment-assumption}
An introduction to distribution theory can begin by making assumptions only about the moments of the noise---rather than fully specifying its probability distribution---as shown below.
\begin{theoremHigh}[Unbiasedness under moment assumption]\label{theorem:samplding_dist_lse_gaussian_mom}
Let $\rvy = \bX\bbeta + \bepsilon$, where $\bepsilon$ is a random vector of noise. 
We only assume $\Exp[\bepsilon] = \bzero$ and $\Cov[\bepsilon]=\sigma^2\bI$ instead of $\bepsilon \sim \normal(0, \sigma^2 \bI)$. 
Suppose further that $\bX \in \real^{n\times p}$ is fixed and has full rank with $n\geq p$ (i.e., rank is $p$). 
Then, the following moment properties hold:
\begin{enumerate}[(i)]
\item The OLS estimator (i.e., MLE) satisfies $\Exp[\widehatbbeta] = \bbeta$ and $\Cov[\widehatbbeta] = \sigma^2(\bX^\top\bX)^{-1}$.
\item The predicted output satisfies $\Exp[\widehat{\rvy}]=\bX\bbeta$ and  $\Cov[\widehat{\rvy}]=\sigma^2\bH$, where $\bH$ is the orthogonal projection matrix with $\bH = \bX(\bX^\top\bX)^{-1}\bX^\top$ (see Section~\ref{section:ortho_geom_ls}).
\item  The error vector $\rve=\rvy-\widehat{\rvy}$ satisfies $\Exp[\rve]=\bzero$ and  $\Cov[\rve]=\sigma^2(\bI-\bH)$.
\end{enumerate}
\end{theoremHigh}
\begin{proof}[of Theorem~\ref{theorem:samplding_dist_lse_gaussian_mom}]
Since the estimator is $\widehatbbeta=(\bX^\top\bX)^{-1} \bX^\top \rvy$, and we assume that $\rvy = \bX\bbeta +\bepsilon$, we have
\begin{equation}
\begin{aligned}
\widehatbbeta &= (\bX^\top\bX)^{-1} \bX^\top(\bX\bbeta+\bepsilon)
=\bbeta + (\bX^\top\bX)^{-1}\bX^\top\bepsilon. \nonumber
\end{aligned}
\end{equation}
Thus, since $\bX$ is fixed, we obtain that
\begin{equation}
\begin{aligned}
\Exp[\widehatbbeta] &= \bbeta+ (\bX^\top\bX)^{-1}\bX^\top \Exp[\bepsilon]
= \bbeta,  \nonumber
\end{aligned}
\end{equation}
and
\begin{equation}
\begin{aligned}
\Cov[\widehatbbeta]
& = \Cov[(\bX^\top\bX)^{-1} \bX^\top \rvy]
\stackrel{\dag}{=} (\bX^\top\bX)^{-1} \bX^\top \Cov[\rvy] \bX(\bX^\top\bX)^{-1}   \\
&\stackrel{\ddag}{=}(\bX^\top\bX)^{-1} \bX^\top \Cov[\bepsilon] \bX(\bX^\top\bX)^{-1}  
=\sigma^2(\bX^\top\bX)^{-1},
\end{aligned}
\end{equation}
where the equality ($\dag$) follows from the fact that $\Cov[\bA\rvv+\bb] = \bA\Cov[\rvv]\bA^\top$, and the equality ($\ddag$) follows from the fact $\bX$ is fixed.
And since $\widehat{\rvy} = \bX \widehatbbeta$, we obtain $\Exp[\widehat{\rvy}] = \bX \bbeta$ and $\Cov[\widehat{\rvy}]=\bX \Cov[\widehatbbeta] \bX^\top = \sigma^2\bH$. \footnote{Given non-random matrix $\bA$ and vector $\bb$, we have $\Exp[\bA \rvv +\bb] = \bA \Exp[\rvv] + \bb$ and $\Cov[\bA\rvv+\bb] = \bA\Cov[\rvv]\bA^\top$.}
Furthermore, since $\rve = \rvy -\widehat{\rvy}$, we have $\Exp[\rve] = \Exp[\rvy-\widehat{\rvy}] = \bzero$ and 
$$
\begin{aligned}
\Cov[\rve] 
&= \Cov[\rvy - \widehat{\rvy}] 
\stackrel{\dag}{=} \Cov[\rvy - \bH\rvy]  
= (\bI-\bH)\Cov[\rvy](\bI-\bH)^\top \\
&=(\bI - \bH)\Cov[\bepsilon](\bI-\bH)^\top 
= \sigma^2(\bI - \bH)(\bI - \bH)^\top 
\stackrel{\ddag}{=}\sigma^2(\bI - \bH), 
\end{aligned}
$$
where the equality ($\dag$) follows from the fact that $\widehat{\rvy}=\bH\rvy$, and the equality ($\ddag$) follows from the fact that $(\bI - \bH)$ is an orthogonal projection.
This completes the proof.
\end{proof}


Once again, from this lemma, we see that the maximum likelihood estimator of $\bbeta$ is unbiased, and the expected value of the error vector is zero. This highlights why the method of maximum likelihood plays a central role in point estimation.

\subsection*{Computing the Covariance Matrix}

In least squares problems, computing the associated covariance matrix is essential for assessing the accuracy of the estimated parameters. Specifically, as shown above, the variance of each estimated coefficient $\beta_i$ is proportional to the $i$-th diagonal element of $(\bX^\top\bX)^{-1}$. 
Consider the full-rank linear model: $\by =\bX\bbeta  + \bepsilon$, where $\bepsilon\sim \normal(\bzero, \sigma^2 \bI)$, $\bX \in \real^{n\times p}$, and  $\bepsilon$ is a random vector with zero mean and covariance matrix $\sigma^2\bI$.
The covariance matrix of the least squares estimate $\widehatbbeta$ is $\sigma^2 \bOmega_{\bbeta}$  (Theorem~\ref{theorem:samplding_dist_lse_gaussian_mom}), where
\begin{equation}
\bOmega_{\bbeta} \triangleq (\bX^\top\bX)^{-1} = (\bR^\top \bR)^{-1} = \bR^{-1}\bR^{-\top},
\end{equation}
and $\bR^\top\bR$ is the Cholesky decomposition of $\bX$. 
Alternatively, $\bX=[\bQ_1,\bQ_2]\begin{bmatrixfoot}
\bR\\
\bzero 
\end{bmatrixfoot}=\bQ_1\bR$ can be seen as the QR decomposition of $\bX$.

The inverse $\bS = \bR^{-1} = [s_{ij}]$ is also upper triangular and can be computed in $p^3/3$ flops from the matrix equation $\bR \bS = \bI$ as follows:
\begin{align*}
&\text{for } j = p, p-1, \ldots, 1 \\
&\qquad s_{jj} = 1/r_{jj}; \\
&\qquad \text{for } i = j-1, \ldots, 2, 1 \\
&\qquad\qquad s_{ij} = -\left(\sum_{k=i+1}^{j} r_{ik}s_{kj}\right)/r_{ii}; \\
&\qquad \text{end} \\
&\text{end}
\end{align*}

The computed elements of $\bS$ can overwrite the corresponding elements of $\bR$ in storage. Forming the upper triangular part of $\bOmega_{\bbeta} = \bS\bS^\top $ requires an additional $p^3/3$ flops. This computation can be sequenced so that the elements of $\bOmega_{\bbeta}$ overwrite those of $\bS$ directly. 
The variance of the components of $\widehatbbeta$ is given by the diagonal elements of $\bOmega_{\bbeta}=[\omega_{ij}]$:
\begin{equation}
\omega_{pp} = s_{pp}^2 = 1/r_{pp}^2, \qquad \omega_{ii} = \sum_{j=i}^{p} s_{ij}^2, \quad i = p-1, \ldots, 1.
\end{equation}
Note that the variance for $\beta_p$ is immediately  available  from the last diagonal element $r_{pp}$.

Frequently, $\bOmega_{\bbeta}$ appears only as an intermediate step, such as when computing the variance of a linear functional $\bg^\top \widehatbbeta$. This variance is expressed as:
\begin{equation}
\bg^\top \bOmega_{\bbeta}\bg = \bg^\top \bR^{-1}\bR^{-\top}\bg = \bz^\top \bz,
\end{equation}
where $\bz \triangleq \bR^{-\top}\bg$. 
Thus, instead of evaluating $\bg^\top \bOmega_{\bbeta}\bg$ directly, it is more numerically stable and efficient to solve the lower triangular system $\bR^\top \bz = \bg$, and then compute $\bz^\top \bz$.

Similarly, the covariance matrix of the residual vector $\be = \by - \bX\widehatbbeta$ is calculated as
\begin{equation}
\sigma^2 \bOmega_{\be} \triangleq \sigma^2(\bI - \bX(\bX^\top\bX)^{-1}\bX^\top ) = \sigma^2(\bI - \bQ_1\bQ_1^\top ), \quad  \bX= \bQ_1\bR.
\end{equation}
Note that  $\bI - \bQ_1\bQ_1^\top $ is the orthogonal projector onto the null space of $\bX^\top $ (Remark~\ref{remark:svd_qr_projs}).

\index{Sampling distribution}
\index{Gaussian disturbance}
\subsection{Sampling Distribution of OLS under Gaussian Disturbance}

In addition to understanding the moments of $\widehatbbeta$, can we also gain insight into its distribution? Analogous to the way Gaussian disturbances influence the likelihood function, they also determine the exact distribution of the associated random variables.
\begin{theoremHigh}[Sampling distribution of LS under Gaussian disturbance]\label{theorem:samplding_dist_lse_gaussian}	 
Let $\rvy = \bX\bbeta + \bepsilon$, where $\bepsilon \sim \normal(\bzero, \sigma^2 \bI)$. Assume $\bX$ is fixed and has full rank with $n\geq p$ (i.e., its rank is $p$). Then,
\begin{enumerate}[(i)]
\item  The OLS estimator satisfies $\widehatbbeta \sim \normal(\bbeta, \sigma^2(\bX^\top \bX)^{-1})$;
\item  The predicted output satisfies $\widehat{\rvy} \sim \normal(\bX\bbeta, \sigma^2\bH)$;
\item  The error vector satisfies $\rve = \rvy -\widehat{\rvy} \sim \normal(\bzero, \sigma^2(\bI-\bH))$.
\end{enumerate}
\end{theoremHigh}

\begin{proof}[of Theorem~\ref{theorem:samplding_dist_lse_gaussian}]
Since $\widehatbbeta=(\bX^\top\bX)^{-1} \bX^\top \rvy$ and we assume that $\rvy = \bX\bbeta +\bepsilon$, we have
\begin{equation}
\widehatbbeta = (\bX^\top\bX)^{-1} \bX^\top(\bX\bbeta+\bepsilon)=\bbeta + (\bX^\top\bX)^{-1}\bX^\top\bepsilon. \nonumber
\end{equation}
Since both $\bbeta$ and $(\bX^\top\bX)^{-1}\bX^\top$ are deterministic, we can apply the affine transformation property of the multivariate normal distribution (see Lemma~\ref{lemma:affine_mult_gauss}), giving:
\begin{equation}
\begin{aligned}
\bbeta + (\bX^\top\bX)^{-1}&\bX^\top\bepsilon 
\sim \normal\left(\bbeta, (\bX^\top\bX)^{-1}\bX^\top (\sigma^2\bI)\left[(\bX^\top\bX)^{-1}\bX^\top\right]^\top\right). \nonumber
\end{aligned}
\end{equation}
Thus, it follows that $\widehatbbeta \sim \normal(\bbeta, \sigma^2(\bX^\top \bX)^{-1})$.
Similarly, for $\widehat{\rvy} = \bX \widehatbbeta$, we obtain $\widehat{\rvy} \sim \normal(\bX\bbeta, \sigma^2\bH)$.
Considering the error variable $\rve= \rvy -\widehat{\rvy}$, we have $\rvy\sim \normal(\bX\bbeta, \sigma^2\bI)$, then
$$
\begin{aligned}
\be &= \rvy - \widehat{\rvy}
=(\bI-\bH)\rvy \\
&\sim \normal\left((\bI-\bH)\bX\bbeta, (\bI-\bH)(\sigma^2\bI)(\bI-\bH)^\top\right) 	\\
&=  \normal\left(\bzero, \sigma^2(\bI-\bH)\right),
\end{aligned}
$$
where the last equality comes from the fact that $(\bI-\bH)$ is an orthogonal projection onto the perpendicular space of $\cspace(\bX)$, and this completes the proof.
\end{proof}

We further show that, under the same assumptions, the error variable $\rve$ is independent of the predicted output and the OLS estimator.
\begin{lemma}[Independence in LS under Gaussian disturbance]\label{theorem:independence_samplding_dist_lse_gaussian}	 
Let $\rvy = \bX\bbeta + \bepsilon$, where $\bepsilon \sim \normal(\bzero, \sigma^2 \bI)$. Assume $\bX\in \real^{n\times p}$ is fixed and has full rank with $n\geq p $ (i.e., its rank is $p$). Then, the random variable $\rve$ is independent of $\widehat{\rvy}$ and $\widehatbbeta$;
\end{lemma}	
\index{Gaussian disturbance}

\begin{proof}[of Lemma~\ref{theorem:independence_samplding_dist_lse_gaussian}]
It is straightforward that 
\begin{equation}
\begin{aligned}
\Cov[\rve, \widehat{\rvy}] &= \Cov[(\bI-\bH)\rvy, \bH\rvy]
= (\bI-\bH)\Cov[\rvy, \rvy]\bH^\top  \\ 
&= (\bI-\bH)(\sigma^2\bI)\bH 
= \sigma^2(\bH-\bH) 
= \bzero \nonumber,
\end{aligned}
\end{equation}
where the second equality follows from the fact that $\Cov[\bA\rvv, \bB\rvw] = \bA\Cov[\rvv,\rvw]\bB^\top$, given  non-random matrix $\bA$ and $\bB$.
As $\widehat{\rvy} = \bX\widehatbbeta$, and $\bX$ is the observed data matrix and is fixed, $\rve$ is independent of $\widehatbbeta$ as well.
This completes the proof.
\end{proof}

Using these results, we can show that $\bH\rvy$ is a sufficient statistic (Definition~\ref{definition:sufficiency}) for the parameter $\bbeta$.
Write $\rvy = \bH\rvy + (\bI - \bH)\rvy = \widehat{\rvy} + \rve$.
Now define the $2n$-dimensional vector $\rvz \triangleq [\widehat{\rvy}^\top, \rve^\top]^\top$.
Since $\widehat{\rvy}$ is independent of $\rve$ (Lemma~\ref{theorem:independence_samplding_dist_lse_gaussian}), conditional on $\widehat{\rvy}$, $\rve$ always has the same distribution $\mathcalN(\bzero, \sigma^2(\bI - \bH))$. It follows that, conditional on $\widehat{\rvy}$, the vector $\rvz$ has a distribution whose first $n$ coordinates equal $\widehat{\rvy}$ almost surely, and whose last $n$ coordinates are $\mathcalN(\bzero, \sigma^2(\bI - \bH))$. Neither of those two depend on $\bbeta$,  proving its sufficiency.


\subsection{$t$-Distribution and $F$-Distribution}
To discuss further results, we need the definitions of $t$- and $F$-distributions.
We have introduced that Chi-squared distribution is a specific  case of the Gamma distribution (Definitions~\ref{definition:gamma_distri} and \ref{definition:chisquare_dist}). 
In this context, we provide the formal definitions of the $t$-distribution  and the $F$-distribution, with a particular emphasis on their close relationship to the Chi-squared distribution.

\begin{figure}[h!]
\centering                      
\vspace{-0.35cm}                  
\subfigtopskip=2pt              
\subfigbottomskip=2pt            
\subfigcapskip=-5pt            
\subfigure[$t$-distribution probability density
functions for different values of the parameter $n$. When increasing the degree of freedom $n$, the density gets closer to $\normal(0,1)$.]{\label{fig:dists_tdistribution}
\includegraphics[width=0.48\linewidth]{./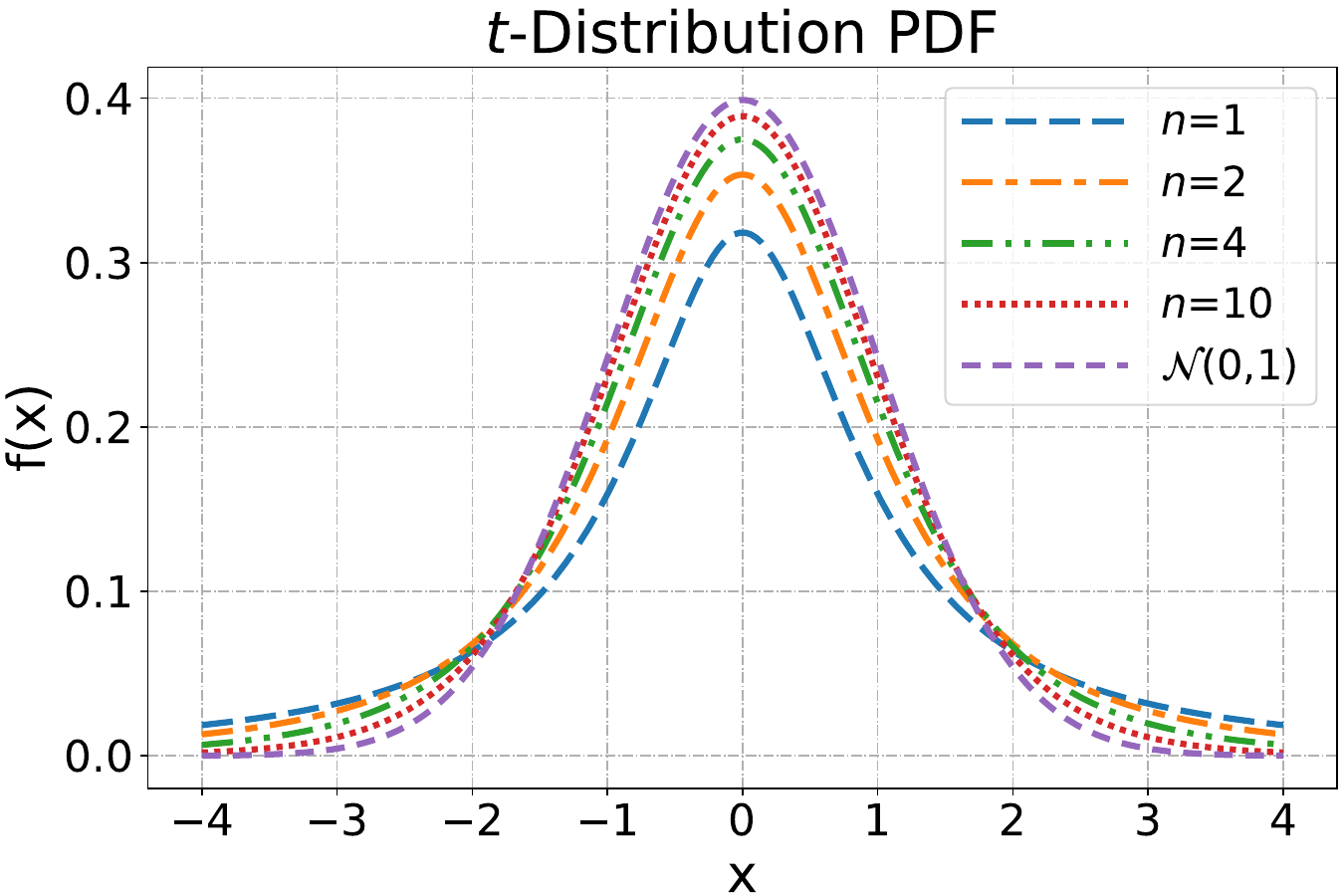}}
\subfigure[$F$-distribution probability density
functions for different values of the parameters $n$ and $d$.]{\label{fig:dists_fdistribution}
\includegraphics[width=0.48\linewidth]{./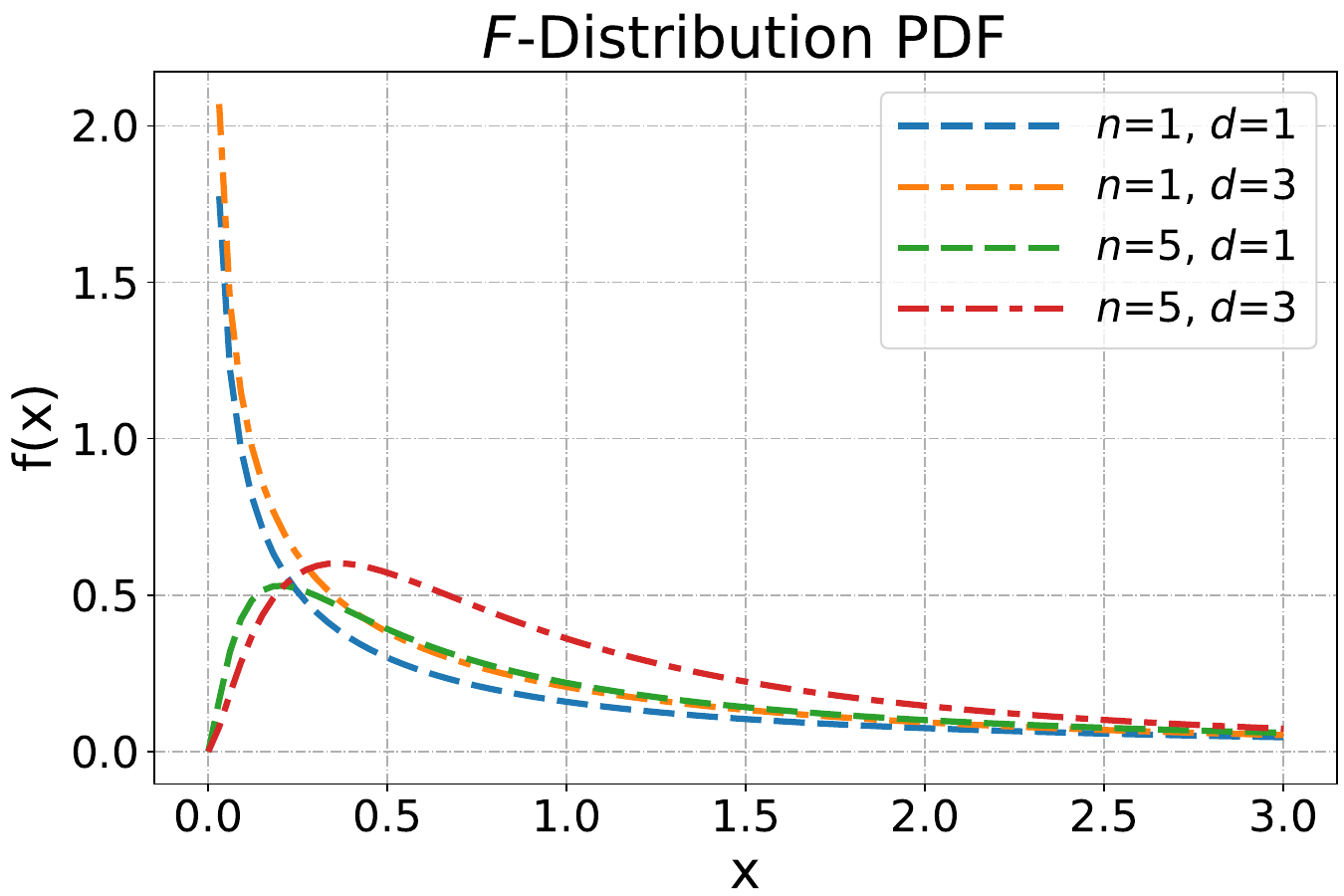}}
\caption{$t$-distribution and $F$-distribution probability density functions.}
\label{fig:caaa}
\end{figure}

\begin{definition}[$t$-distribution]\index{$t$-distribution}
Let $\ry$ and $\rz$ be independent random variables such that $\ry\sim \normal (0,1)$ and $\rz\sim \chi_{(n)}^2$. 
Then the random variable defined by $\rx\triangleq \frac{\ry}{\sqrt{\rz/n}}$ follows a $t$-distribution with $n$ degrees of freedom, denoted  $\rx\sim t_{(n)}$. The probability density function of $\sqrt{\rz/n}$ is given by 
$$ h_2(z; n)=\left\{
\begin{aligned}
&\frac{2n^{n/2}}{2^{n/2}\Gamma(\frac{n}{2})} z^{n-1} \exp(-\frac{nz^2}{2}) ,& \mathrm{\,\,if\,\,} z \geq 0;  \\
&0 , &\mathrm{\,\,if\,\,} z <0.
\end{aligned}
\right.
$$
And the probability density function of $\ry$ is given by 
$$
h_1(y) = \frac{1}{\sqrt{2\pi}} \exp\{-\frac{y^2}{2}\}.
$$
Then the probability density function of the $t$-distribution can be obtained by 
$$
\begin{aligned}
g(x;n)&=\int_{0}^{\infty} t \cdot h_1(x t) \cdot  h_2(t) dt 
= \frac{\Gamma(\frac{n+1}{2})}{\sqrt{n\pi} \Gamma(\frac{n}{2})} \left(1+\frac{x^2}{n}\right)^{-\frac{n+1}{2}}.
\end{aligned}
$$
The shape of the $t$-distribution resembles that of the standard normal distribution---it is symmetric about zero.  
However, it has heavier tails when $n$ is small.
And when $n\rightarrow \infty$, $t$-distribution converges to $\normal(0,1)$.
Figure~\ref{fig:dists_tdistribution} compares  the $t$-distribution for various values of $n$ and $\normal(0,1)$.
\end{definition}
Note that $t$-distribution is the univariate version of Definition~\ref{definition:multivariate-stu-t}.

\begin{definition}[$F$-distribution]\index{$F$-distribution}
The $F$-distribution arises as the ratio of two independent Chi-squared distributions, each divided by its degrees of freedom. 
Using $n$ and $d$ to denote numerator and denominator, respectively, we define 
$$
\begin{aligned}
\frac{\frac{1}{n}\chi_{(n)}^2}{\frac{1}{d}\chi_{(d)}^2} \sim F_{n,d}.
\end{aligned}
$$
And the probability density function is given by 
$$ f(x; n,d)=\left\{
\begin{aligned}
&\frac{\Gamma(\frac{n+d}{2})}{\Gamma(\frac{n}{2}) \Gamma(\frac{d}{2})}
n^{n/2} d^{d/2} x^{\frac{n}{2}-1} \left(d+nx\right)^{-\frac{1}{2}(n+d)}
,& \mathrm{\,\,if\,\,} x \geq 0;  \\
&0 , &\mathrm{\,\,if\,\,} x <0.
\end{aligned}
\right.
$$
A notable special case is that if $\rx\sim t_{(n)}$, then $\rx^2\sim F_{1,n}$.
Figure~\ref{fig:dists_fdistribution} illustrates how the shape of the $F$-distribution varies with different combinations of $n$ and $d$.
\end{definition}

\index{Sampling distribution}
\index{Sesidual sum of squares}
\subsection{Sampling Distribution of RSS under Gaussian Disturbance}\label{sec:dist_sse}

Another crucial result we will prove is the sum of squares due to error (or called the \textit{residual sum of squares (RSS)}) $\rve^\top\rve \sim\sigma^2 \chi^2_{(n-p)} $, by which we can construct  an unbiased estimator of $\sigma^2$. The distribution $\chi^2_{(n-p)}$ is the Chi-squared distribution with  $n-p$ degrees of freedom (Definition~\ref{definition:chisquare_dist}).

\begin{theoremHigh}[Distribution of sum of squares due to error]\label{theorem:ss-chisquare}
Let $\rvy = \bX\bbeta + \bepsilon$, where $\bepsilon \sim \normal(\bzero, \sigma^2 \bI)$. Assume $\bX\in\real^{n\times p}$ is fixed and has full rank with $n\geq p$ (i.e., its rank is $p$). Then, we have 
\begin{enumerate}[(i)]
\item  Sum of squares due to error satisfies $\rve^\top\rve=\sum_i^n \re_i^2 \sim \sigma^2 \chi^2_{(n-p)}$, where $\rve = \rvy - \widehat{\rvy}$;

\item  An unbiased estimator of $\sigma^2$ is $\rS^2=\frac{1}{n-p} (\rvy-\widehat{\rvy})^\top(\rvy-\widehat{\rvy})=\frac{\rve^\top\rve}{n-p}$;
\item  The random variables $\widehatbbeta$ and $\rS^2$ are independent.
\end{enumerate}
\end{theoremHigh}
\index{Spectral decomposition}
\begin{proof}[of Theorem \ref{theorem:ss-chisquare}]
We realize that $\sum_i^n \re_i^2 = \rve^\top\rve = [(\bI-\bH)\rvy]^\top[(\bI-\bH)\rvy]$. By expressing $\rvy$ into $\rvy =\bX\bbeta+\bepsilon$ and using the fact that $\bH\bX\bbeta = \bX\bbeta$, we have:
\begin{equation}
\begin{aligned}
(\bI-\bH)\rvy &= (\bI - \bH)(\bX\bbeta+\bepsilon) 
= (\bI - \bH)\bepsilon, \nonumber
\end{aligned}
\end{equation} 
by which we can rewrite $[(\bI-\bH)\rvy]^\top[(\bI-\bH)\rvy]$ as 
\begin{equation}
\begin{aligned}
\sum_i^n \re_i^2 &= [(\bI-\bH)\rvy]^\top[(\bI-\bH)\rvy] =\bepsilon (\bI-\bH)^\top(\bI-\bH)\bepsilon 
= \bepsilon^\top(\bI-\bH)\bepsilon. \nonumber
\end{aligned}
\end{equation}
By Spectral Theorem~\ref{theorem:spectral_theorem} and Proposition~\ref{proposition:eigenvalues-of-projection} (the only possible eigenvalues of the hat matrix are 0 and 1), we can express the sum of squares due to error as $\sum_{i=1}^n \re_i^2 = \bepsilon^\top(\bI-\bH)\bepsilon = \bepsilon^\top(\bQ \bLambda\bQ^\top)\bepsilon$, where $\bI-\bH=\bQ \bLambda\bQ^\top$ is the spectral decomposition of $\bI-\bH$. 
Given the fact that rotations on the normal distribution do not affect the distribution~(Lemma~\ref{lemma:rotat_multi_gauss}), we can define
\begin{equation}
\boldeta \triangleq \bQ^\top\bepsilon \sim \normal(\bzero, \sigma^2\bI). \nonumber
\end{equation}
Thus, it follows that 
\begin{equation}
\sum_{i=1}^n \re_i^2 = \boldeta^\top \bLambda \boldeta \sim \sigma^2 \chi^2_{\mathrm{rank(\bI-\bH)}}\sim\sigma^2 \chi^2_{(n-p)}, \nonumber
\end{equation}
where, according to Lemma~\ref{lemma:rank-of-symmetric-idempotent},
$$
\begin{aligned}
\rank(\bI-\bH) 
&= \trace(\bI) - \trace(\bH)
=n-\trace(\bX (\bX^\top\bX)^{-1}\bX^\top)\\
&=n-\trace( (\bX^\top\bX)^{-1}\bX^\top\bX) 
=n-p.
\end{aligned}
$$

Therefore, it can be shown that  $\Exp[\rve^\top\rve] = \sigma^2 (n-p)$, leading to an  unbiased estimator for $\sigma^2$, denoted by $\rS^2=\frac{\rve^\top\rve}{n-p}$.

As proved in Lemma~\ref{theorem:independence_samplding_dist_lse_gaussian}, $\rve$ is independent of $\widehat{\rvy}$.
Then, $\rS^2=\frac{\rve^\top\rve}{n-p}$ is independent of $\widehatbbeta$ as well.
This completes the proof.
\end{proof}

\index{GLS}
\begin{remark}[Sampling distribution for GLS]\label{remark:samp_gls}
In the generalized least squares (GLS) setting described in Theorem~\ref{theorem:gauss_markov_gls}, the covariance matrix of the estimator $\widehatbbeta$ is given by
\begin{equation}
\Cov[\widehatbbeta] = \sigma^2 (\bX^\top \bOmega^{-1} \bX)^{-1} \in \real^{n \times n},
\end{equation}
and an unbiased estimate of $ \sigma^2 $ is
\begin{equation}
S^2 = \frac{1}{n-p} \be^\top \bOmega^{-1} \be , \quad \be \triangleq \by - \bX \widehatbbeta.
\end{equation}
\end{remark}

\paragrapharrow{Degree of freedom.}
To see why $\rS^2$ (which is the sum of squares due to error divided by $(n-p)$) is an unbiased estimator of $\sigma^2$, we delve into the concept of  degrees of freedom.
\index{Degree of freedom}
\begin{remark}[Degrees of freedom of the error vector $\rve$]\label{remark:df-of-error-vector}
The unbiased estimator $\rS^2$ of $\sigma^2$ adjusts the \textit{degree of freedom (df)} of $\rve$. 
Specifically, if $\bX \in \real^{n\times p}$ has full column rank with $n\geq p$, then  the degrees of freedom associated with $\rve$ is $(n-p)$.
\end{remark}
It may not be immediately clear why $(n-p)$   is referred to as the degrees of freedom of $\rve$. In general, degrees of freedom represent the dimension of the space in which a vector can vary---that is, how freely it can move within that space.
Since $\rve\in \real^n$ lies orthogonal to the column space of $\bX$ as shown in Figure~\ref{fig:ls-geometric2}. That is, $\bX^\top \rve = \bzero$ (see Lemma~\ref{lemma:opt_cond_ls}), and $\rve$ is in the null space of $\bX^\top$, which has dimension $n-p$.
Thus, although $\rve$ resides in $\real^n$, it is constrained by $p$ linear relationships and therefore loses $p$ degrees of freedom.

With this unbiased estimator of noise variance $\sigma^2$, we are now equipped to answer various inferential questions. Here is a typical example:
\begin{example}[Application of sampling distribution]
Given any non-random vector $\bc$, we wish to find the distribution for the following equation:
\begin{equation}
\frac{\bc^\top\widehatbbeta - \bc^\top\bbeta}{\rS \sqrt{\bc^\top(\bX^\top\bX)^{-1}\bc}}. \nonumber
\end{equation}
From $\widehatbbeta\sim \normal(\bbeta, \sigma^2(\bX^\top\bX)^{-1})$, we can find $(\bc^\top\widehatbbeta - \bc^\top\bbeta) \sim \normal(0, \sigma^2\bc^\top(\bX^\top\bX)^{-1}\bc)$. This makes
$$
\frac{\bc^\top\widehatbbeta - \bc^\top\bbeta}{\sigma \sqrt{\bc^\top(\bX^\top\bX)^{-1}\bc}}\sim \normal(0, 1).
$$
Recall that $\rS^2 \sim \frac{\sigma^2 \chi^2_{(n-p)}}{n-p}$. We have $\frac{\rS^2}{\sigma^2} \sim \frac{ \chi^2_{(n-p)}}{n-p} $.
This implies 
\begin{equation}
\frac{\bc^\top\widehatbbeta - \bc^\top\bbeta}{\rS \sqrt{\bc^\top(\bX^\top\bX)^{-1}\bc}}\sim t_{(n-p)}, \nonumber
\end{equation}
which follows a $t$-distribution (suppose $\rx\sim \normal(0,1)$ and $\ry\sim \chi^2_{(n)}$; then, $\frac{\rx}{\sqrt{\ry/n}} \sim t_{(n)}$), and by which we could answer question of interest; for example, this result enables us to construct confidence intervals and perform hypothesis tests on individual coefficients.

For example, let $\bc=\be_k$ be the $k$-th unit basis vector. Then the confidence interval for the  $k$-th coordinate is 
$$
\be_k^\top\widehatbbeta \pm t_{(n-p)}(\alpha/2){\rS \sqrt{\be_k^\top(\bX^\top\bX)^{-1}\be_k}}.
$$

When $\bc=\bx_{\new}$ is a new data observation, and we want to predict $y_{\new}=\bx_{\new}^\top\widehatbbeta$. Then the $(1-\alpha)\times 100\%$ confidence interval is 
$$
\bx_{\new}^\top\widehatbbeta \pm t_{(n-p)}(\alpha/2){\rS \sqrt{\bx_{\new}^\top(\bX^\top\bX)^{-1}\bx_{\new}}}.
$$
In many cases, we may model $y_{\new}=\bx_{\new}^\top\bbeta +\epsilon$ where $\epsilon\sim\normal(0, \sigma^2)$. Then the confidence interval becomes 
$$
\bx_{\new}^\top\widehatbbeta \pm t_{(n-p)}(\alpha/2){\rS \sqrt{1+\bx_{\new}^\top(\bX^\top\bX)^{-1}\bx_{\new}}}.
$$
This is known as the \textit{prediction interval}, which provides confidence bounds for a future observed response rather than for the expected value alone.
\end{example}

\paragrapharrow{Minimum MSE estimator of noise variance $\sigma^2$.}
Moreover, we decompose the noise variance in terms of mean squared error, from which we could find the minimum MSE estimator of the noise variance. 
\index{Bias-variance decomposition}

\begin{lemma}[Bias-variance decomposition of noise variance]\label{lemma:bias-variance-noise-variance}
For any estimator $\bar{\sigma}^2$ of $\sigma^2$, the mean squared error of the estimator have the following decomposition:
\begin{equation}
\begin{aligned}
\MSE(\bar{\sigma}^2, \sigma^2) 
&=\Exp[(\bar{\sigma}^2 - \sigma^2)^2] = \Exp[(\bar{\sigma}^2)^2] - 2\sigma^2\Exp[\bar{\sigma}^2] +(\sigma^2)^2\\
&= \normtwo{\Exp[\bar{\sigma}^2] - \sigma^2}^2 + \Exp\big[\normtwo{\bar{\sigma}^2 - \Exp(\bar{\sigma}^2)}^2\big] \\
&= \normtwo{\bias(\bar{\sigma}^2, \sigma^2)}^2 + \Var[\bar{\sigma}^2].\\\nonumber
\end{aligned}
\end{equation}
\end{lemma}

\index{Minimum MSE estimator}
In Theorem~\ref{theorem:mle-gaussian}, we showed that the MLE of $\sigma^2$ is $\widehat{\sigma}^2=\frac{1}{n}\rve^\top\rve $, which is a biased estimator of $\sigma^2$.
According to Theorem~\ref{theorem:ss-chisquare}, an unbiased estimator of $\sigma^2$ is given by $\rS^2=\frac{1}{n-p}\rve^\top\rve$.

Define the function $\bar{\sigma}^2(k) = \frac{1}{k}\rve^\top\rve$. Then, the MLE of $\sigma^2$ can be denoted by $\bar{\sigma}^2(n)$, and the unbiased estimator $\rS^2$ of $\sigma^2$ can be expressed as  $\rS^2=\bar{\sigma}^2(n-p)$. 
The value of $k$ that minimizes the mean squared error $\MSE(\bar{\sigma}^2(k), \sigma^2)$ is $k=n-p+2$. 
We thus have 
$$
\begin{aligned}
\MSE(\widehat{\sigma}^2, \sigma^2) &= (\frac{n-p}{n}\sigma^2-\sigma^2)^2 + \frac{2(n-p)\sigma^4}{{n}^2} = \frac{p^2+2(n-p)}{n^2}\sigma^4;\\
\MSE(\rS^2, \sigma^2) &= (\frac{n-p}{n-p}\sigma^2-\sigma^2)^2 + \frac{2(n-p)\sigma^4}{{(n-p)}^2} = \frac{2\sigma^4}{n-p}; \\
\MSE(\bar{\sigma}^2(n-p+2), \sigma^2) &= (\frac{n-p}{n-p+2}\sigma^2-\sigma^2)^2 + \frac{2(n-p)\sigma^4}{{(n-p+2)}^2} = \frac{2n-2p}{(n-p+2)^2}\sigma^4,
\end{aligned}
$$
which implies
$$
\MSE(\bar{\sigma}^2(n-p+2), \sigma^2) \leq \MSE(\rS^2, \sigma^2).
$$
The result shows that $\bar{\sigma}^2(n-p+2)$ has a smaller mean squared error than $\rS^2$, meaning it tends to be closer to $\sigma^2$ under this performance measure.
However, $\bar{\sigma}^2(n-p+2)$ is biased and will underestimate $\sigma^2$ on average \footnote{When the bias at some coordinate of $\sigma^2$ is positive, we call it \textit{overestimation}; when it is negative, we call it \textit{underestimation}.}. This bias raises concerns about the reliability of $\bar{\sigma}^2(n-p+2)$ as a general-purpose estimator for $\sigma^2$.

In general, since MSE depends on the true parameter value, there is no single estimator that uniformly minimizes MSE across all possible values $\sigma^2$. Therefore, we often restrict our attention to a specific class of estimators---such as unbiased estimators---and seek the one with the lowest variance.
A commonly used approach is to focus on unbiased estimators and choose the one with minimum variance; such an estimator is called the \textit{best unbiased estimator (BUE)}. If we further restrict our attention to linear estimators, we obtain the \textit{best linear unbiased estimator (BLUE)}.
See Section~\ref{section:beta-blue}  for a discussion about the BLUE of $\bbeta$. 



\index{BUE}
\index{Best unbiased estimator}
\index{Best linear unbiased estimator}
\index{Unbiased estimator}

In summary, various estimators for $\bbeta$ and $\sigma^2$ are compared in Table~\ref{table:diff-estitors}.
\begin{table}[h!]\centering
\setlength{\tabcolsep}{5.3pt}    
\begin{tabular}{c|c|c|c|c}
\hline
& OLS Estimator                   & MLE                             & \multicolumn{1}{l|}{Unbiased Estimator} & \multicolumn{1}{l}{Minimum MSE Estimator} \\ \hline\hline
$\bbeta$   & $(\bX^\top\bX)^{-1}\bX^\top\rvy$ & $(\bX^\top\bX)^{-1}\bX^\top\rvy$ & $(\bX^\top\bX)^{-1}\bX^\top\rvy$         &                                           \\ \hline
$\sigma^2$ &                                 & $\frac{1}{n}\rve^\top\rve$        & $\frac{1}{n-p}\rve^\top\rve$              & $\frac{1}{n-p+2}\rve^\top\rve$              \\ \hline
\end{tabular}\caption{Comparison of different estimators for $\bbeta$ and $\sigma^2$.}\label{table:diff-estitors}
\end{table}

\index{In-sample error}
\index{Out-of-sample evaluation}
\index{Learning curve}
\subsection{Learning Curve of Least Squares under Gaussian Disturbance}
To differentiate the test data error (the data we do not see), we introduce the concept of  \textbf{in-sample error} (also known as  the in-sample sum of squares due to error) by $\MSE_{\text{in}}(\bbeta) = \frac{1}{n} \sum_{i=1}^{n} (\ry_i - \widehat{\ry}_i)^2 = \frac{1}{n}\rve^\top\rve$ for $n$ available data samples.
Additionally, we define the \textbf{out-of-sample error} (also known as  the out-of-sample sum of squares due to error) as the expected squared error of test data, given by $\MSE_{\text{out}}(\bbeta) = \Exp[(\ry_\ast - \widehat{\ry}_\ast)^2]$.
We then derive the expressions for the expected in-sample error and out-of-sample error under the assumption of Gaussian noise disturbance.
\begin{theoremHigh}[Expectation of in-sample error under Gaussian disturbance]\label{theorem:in-sample-error}
Let $\rvy = \bX\bbeta + \bepsilon$, where $\bepsilon \sim \normal(\bzero, \sigma^2 \bI)$. Assume $\bX\in \real^{n\times p}$ is fixed and has full rank with $p<n$ (i.e., its rank is $p$). Then, we have 
\begin{enumerate}[(i)]
\item  The expected  in-sample error: $\Exp[\MSE_{\text{in}}(\bbeta)]= \frac{n-p}{n}\sigma^2$;
\item  The expected  out-of-sample error: $\Exp[\MSE_{\text{out}}(\bbeta)]$ converges to $\frac{n+p}{n}\sigma^2+ \mathcalO(\frac{1}{n})$.
\end{enumerate}
\end{theoremHigh}

\begin{proof}[of Theorem~\ref{theorem:in-sample-error}]
As a recap,	the sum of squares due to error is defined as 
$$
\begin{aligned}
\rve^\top\rve 
&=\normtwo{\rvy - \widehat{\rvy}}^2=\normtwo{\bX\bbeta+\bepsilon - \bH\rvy}^2 
=\normtwo{\bX\bbeta+\bepsilon - \bH(\bX\bbeta+\bepsilon)}^2\\
&\stackrel{\dag}{=}\normtwo{\bX\bbeta+\bepsilon - \bX\bbeta-\bH\bepsilon}^2 
=\normtwo{(\bI-\bH)\bepsilon }^2  =\left((\bI-\bH)\bepsilon\right)^\top((\bI-\bH)\bepsilon) \\
&=\bepsilon^\top(\bI-\bH)^\top(\bI-\bH)\bepsilon  
=\bepsilon^\top(\bI-\bH)(\bI-\bH)\bepsilon 
=\bepsilon^\top(\bI-\bH)\bepsilon, 
\end{aligned}
$$
where the equality ($\dag$) follows from the fact that $\bX\bbeta$  lies in $\cspace(\bX)$, and the last two equalities arise from the fact that $\bI-\bH$ is symmetric and idempotent.
Next, by taking  the expectation of the sum of squares due to error with respect to $\bepsilon$, we obtain:
$$
\begin{aligned}
\Exp[\rve^\top\be] &=\Exp[\bepsilon^\top(\bI-\bH)\bepsilon] 
=\trace( (\bI-\bH) \cdot \sigma^2\bI )
=\sigma^2(n-p),    
\end{aligned}
$$
where the second equality follows from the fact that: for random variable $\rvb$ and non-random matrix $\bA$, we have
\begin{equation}\label{equation:trace-to-expectation}
\Exp[\rvb^\top\bA\rvb] = \trace(\bA \Cov[\rvb]) + \Exp[\rvb]^\top\bA\Exp[\rvb];
\end{equation}
see Lemma~\ref{lemma:quad_tra_prob}.
This expectation aligns with the outcome derived in Theorem~\ref{theorem:ss-chisquare}, where we establish that  $\rve^\top\rve\sim \sigma^2 \chi^2_{(n-p)}$ with a  expectation of $\sigma^2(n-p)$. 
Consequently, we have  
$$
\Exp[\MSE_{\text{in}}(\bbeta)]=\frac{1}{n}\Exp[\rve^\top\be] = \frac{n-p}{n}\sigma^2.
$$
Note here, we can directly obtain the expectation of $\rve^\top\rve$ from  Theorem~\ref{theorem:ss-chisquare}. 
The presented proof offers an alternative approach to determine the expectation of $\rve^\top\rve$.

For the second part of the claim, 
given the test input $\rvx_\ast$, test output $\ry_\ast$, and test noise $\epsilon_\ast$,  the test error is
\begin{equation}
\begin{aligned}
\re_\ast 
&=\ry_\ast - \rvx_\ast^\top\widehatbbeta
=\ry_\ast - \rvx_\ast^\top(\bX^\top\bX)^{-1}\bX^\top\rvy 
=(\rvx_\ast^\top\bbeta+\epsilon_\ast) - \rvx_\ast^\top(\bX^\top\bX)^{-1}\bX^\top(\bX\bbeta+\bepsilon) \\
&=\epsilon_\ast - \rvx_\ast^\top(\bX^\top\bX)^{-1}\bX^\top\bepsilon + \left[\rvx_\ast^\top\bbeta-\rvx_\ast^\top(\bX^\top\bX)^{-1}\bX^\top\bX\bbeta\right]
=\epsilon_\ast - \rvx_\ast^\top(\bX^\top\bX)^{-1}\bX^\top\bepsilon. \nonumber
\end{aligned}
\end{equation}
Then, the squared test error can be obtained by
\begin{equation}
\begin{aligned}
\re_\ast^2 &=(\epsilon_\ast - \rvx_\ast^\top(\bX^\top\bX)^{-1}\bX^\top\bepsilon)^2 \\
&=\epsilon_\ast^2-2\epsilon_\ast \rvx_\ast^\top(\bX^\top\bX)^{-1}\bX^\top\bepsilon + \left(\rvx_\ast^\top(\bX^\top\bX)^{-1}\bX^\top\bepsilon\right)\left(\rvx_\ast^\top(\bX^\top\bX)^{-1}\bX^\top\bepsilon\right)^\top \\
&=\epsilon_\ast^2-2\epsilon_\ast \rvx_\ast^\top(\bX^\top\bX)^{-1}\bX^\top\bepsilon + \rvx_\ast^\top( \bX^\top\bX)^{-1}\bX^\top\bepsilon \bepsilon^\top \bX (\bX^\top\bX)^{-1}  \rvx_\ast \nonumber
\end{aligned}
\end{equation}

\paragraph{Step 1.} Taking the expectation of the squared test error with respect to the test input $\rvx_\ast$,
\begin{equation}
\begin{aligned}
\Exp_{\rvx_\ast}[\re_\ast^2] &=\epsilon_\ast^2-2\epsilon_\ast \Exp_{\rvx_\ast}[\rvx_\ast]^\top(\bX^\top\bX)^{-1}\bX^\top\bepsilon + \trace\left(\rmM  \Cov_{\rvx_\ast}[\rvx_\ast]\right) + \Exp_{\rvx_\ast}[\rvx_\ast]^\top\rmM\Exp_{\rvx_\ast}[\rvx_\ast],  \nonumber
\end{aligned}
\end{equation}
where $\rmM\triangleq( \bX^\top\bX)^{-1}\bX^\top\bepsilon \bepsilon^\top \bX (\bX^\top\bX)^{-1}$, and the last two components follows again from Equation~\eqref{equation:trace-to-expectation}.

\paragraph{Step 2.} Taking the expectation of the squared test error with respect to the test noise $\epsilon_\ast$, 
\begin{equation}
\small
\begin{aligned}
\Exp_{\epsilon_\ast}[\Exp_{\rvx_\ast}[\re_\ast^2]] 
&=\Exp_{\epsilon_\ast}[\epsilon_\ast^2]-2\Exp_{\epsilon_\ast}[\epsilon_\ast] \Exp_{\rvx_\ast}[\rvx_\ast]^\top(\bX^\top\bX)^{-1}\bX^\top\bepsilon
+ \trace\left(\rmM  \Cov[\rvx_\ast]\right) + \Exp_{\rvx_\ast}[\rvx_\ast]^\top\rmM\Exp_{\rvx_\ast}[\rvx_\ast] \\
&=\sigma^2 +\trace\left(\rmM  \Cov_{\rvx_\ast}[\rvx_\ast]\right) + \Exp_{\rvx_\ast}[\rvx_\ast]^\top\rmM\Exp_{\rvx_\ast}[\rvx_\ast]   \nonumber
\end{aligned}
\end{equation}

\paragraph{Step 3.} Taking the expectation of the squared test error with respect to the input noise $\bepsilon$, 
\begin{equation} \label{equation:learning-curve-part3}
\begin{aligned}
\Exp_{\bepsilon}[\Exp_{\epsilon_\ast}[\Exp_{\rvx_\ast}[\re_\ast^2]]] &=\sigma^2 +\Exp_{\bepsilon}\left[\trace\left(( \bX^\top\bX)^{-1}\bX^\top\bepsilon \bepsilon^\top \bX (\bX^\top\bX)^{-1}  \Cov_{\rvx_\ast}[\rvx_\ast]\right)\right] \\
&\gap + \Exp_{\bepsilon}\left[\Exp_{\rvx_\ast}[\rvx_\ast]^\top( \bX^\top\bX)^{-1}\bX^\top\bepsilon \bepsilon^\top \bX (\bX^\top\bX)^{-1}\Exp_{\rvx_\ast}[\rvx_\ast]\right]. \\
\end{aligned}
\end{equation}

\paragraph{Step 3.1.} For the second part of the above equation, since the trace of a product is invariant under cyclical permutations of the factors, we have:
$$
\begin{aligned}
&\,\,\,\,\,\,\,\,\Exp_{\bepsilon}\left[\trace\left(( \bX^\top\bX)^{-1}\bX^\top\bepsilon \bepsilon^\top \bX (\bX^\top\bX)^{-1}  \Cov_{\rvx_\ast}[\rvx_\ast]\right)\right] \\
&=\Exp_{\bepsilon}\left[\trace\left( \bepsilon^\top \bX (\bX^\top\bX)^{-1}  \Cov_{\rvx_\ast}[\rvx_\ast] ( \bX^\top\bX)^{-1}\bX^\top\bepsilon\right)\right].
\end{aligned}
$$
Following the fact that: for random variable $\rvb$ and non-random matrix $\bA$, since $\rvb^\top\bA\rvb$ is a scalar, we have 
$
\rvb^\top\bA\rvb = 	\trace(\rvb^\top\bA\rvb) = \trace(\bA\rvb\rvb^\top), 
$
where the last equation follows from the fact that the trace of a product is invariant under cyclical permutations of the factors:
$$
\trace(\bA\bB\bC) = \trace(\bB\bC\bA) = \trace(\bC\bA\bB),
$$
if all $\bA\bB\bC$, $\bB\bC\bA$, and $\bC\bA\bB$ exist.
Then, it follows that 
$
\Exp[\rvb^\top\bA\rvb]=\Exp[\trace(\bA\rvb\rvb^\top)]=\trace(\Exp[\bA\rvb\rvb^\top])= \trace(\bA\Exp[\rvb\rvb^\top]), 
$
where the second equality follows from the linear property of trace.

Let $\rvb\triangleq( \bX^\top\bX)^{-1}\bX^\top\bepsilon $ and $\bA\triangleq\Cov_{\rvx_\ast}[\rvx_\ast]$, then the second part of Equation~\eqref{equation:learning-curve-part3} is 
\begin{equation}
\begin{aligned}
&\Exp_{\bepsilon}\left[\trace\left( \bepsilon^\top \bX (\bX^\top\bX)^{-1}  \Cov_{\rvx_\ast}[\rvx_\ast] ( \bX^\top\bX)^{-1}\bX^\top\bepsilon\right)\right] 
=\Exp[\trace(\rvb^\top\bA\rvb)] 
= \trace(\bA\Exp[\rvb\rvb^\top]) \\
&\stackrel{\dag}{=} \trace\left(\Cov_{\rvx_\ast}[\rvx_\ast] \sigma^2 (\bX^\top\bX)^{-1}\right)  
=\frac{\sigma^2}{n} \trace\left(\Cov_{\rvx_\ast}[\rvx_\ast] \left(\frac{1}{n}\bX^\top\bX\right)^{-1}\right)
\stackrel{n\rightarrow \infty}{\longrightarrow} \frac{p}{n} \sigma^2,   \nonumber
\end{aligned}
\end{equation}
where the  equality $(\dag)$ follows from the assumption that $\bepsilon \sim \normal(\bzero, \sigma^2\bI)$, we have $\rvb\sim \normal(\bzero, \sigma^2 (\bX^\top\bX)^{-1})$ and $\Exp[\rvb\rvb^\top] = \Cov[\rvb] = \sigma^2 (\bX^\top\bX)^{-1}$. And the last equation follows from the fact that \textcolor{mydarkblue}{$(\frac{1}{n}\bX^\top\bX)$ converges to $\Cov_{\rvx_\ast}[\rvx_\ast]$} as $n \rightarrow \infty$, and the trace of a $p\times p$ identity matrix is $p$.

\paragraph{Step 3.2.} Similarly, for the third part of Equation~\eqref{equation:learning-curve-part3}, we have
\begin{equation}
\begin{aligned}
\Exp_{\bepsilon}\left[\Exp_{\rvx_\ast}[\rvx_\ast]^\top( \bX^\top\bX)^{-1}\bX^\top\bepsilon \bepsilon^\top \bX (\bX^\top\bX)^{-1}\Exp_{\rvx_\ast}[\rvx_\ast]\right] &= \frac{\sigma^2 \Exp[\rvx_\ast]^\top \Cov[\rvx_\ast] \Exp[\rvx_\ast] ]}{n}, \nonumber
\end{aligned}
\end{equation}
which is an order of $\mathcalO(\frac{1}{n})$. 

Finally, we reduce Equation~\eqref{equation:learning-curve-part3} to 
$
\frac{n+p}{n}\sigma^2 + \mathcalO(\frac{1}{n}). 
$
This completes the proof.
\end{proof}

\begin{figure}[htp]
\centering
\includegraphics[width=0.65\textwidth]{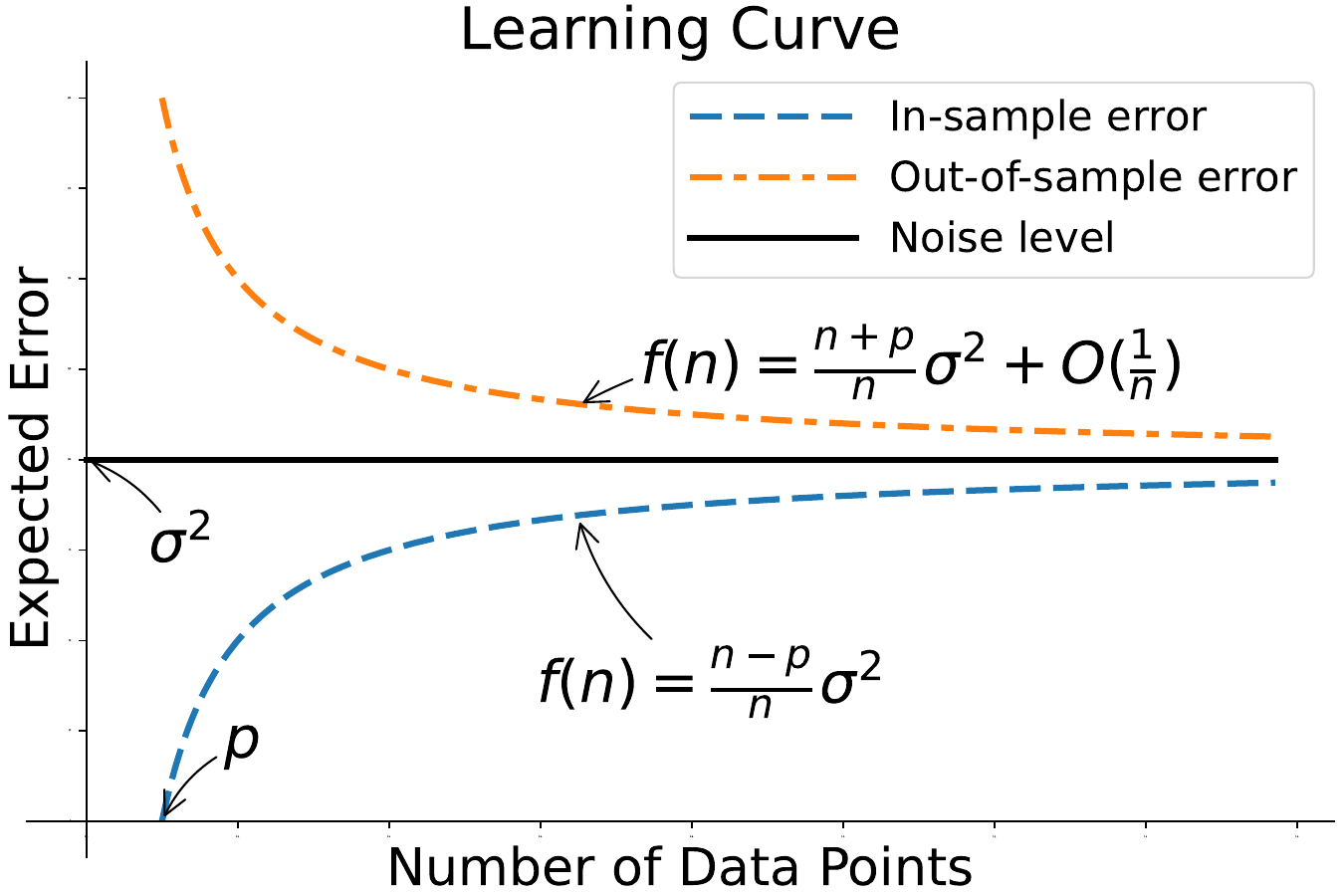}
\caption{Learning curve of least squares under Gaussian noise disturbance.}
\label{fig:learningcurve}
\end{figure}

Thus, we obtain the learning curve of least squares under Gaussian noise, as illustrated  in Figure~\ref{fig:learningcurve}. When the number of samples  $n$ significantly exceeds the dimension $p$, 
both the expected in-sample error and the expected out-of-sample error converge towards the noise level.


\section{Large-Sample Properties in LS Estimator*}

\index{Large-sample properties}
\index{Convergence analysis}
\subsection{Convergence Results}\label{appendix:convergence-results}

We briefly introduce fundamental convergence results and asymptotic theory in this section. 
For more comprehensive discussions and proofs, see \citet{cameron2005microeconometrics, panaretos2016statistics, shao2006mathematical, Hansen2007, yu2021econometric, gut2009convergence, wooldridge2010econometric, hayashi2011econometrics, bickel2015mathematical}.

\subsection*{Convergence}
We begin by defining convergence in distribution:
\begin{definition}[Convergence in distribution]\index{Convergence in distribution}
Let $\{F_n\}_{n\geq 1}$ be a sequence of distribution functions and $G$ be a distribution function on $\real$. That is, the distribution function are
$$
F_n (x) = \prob[\rx_{n} \leq x],\ \forall \, n,
\qquad \text{and} \qquad 
G (x) = \prob[\rx \leq x].
$$ 
We say that $F_n$ converges in distribution to $G$, and write $F_n\stackrel{d}{\longrightarrow} G$ (or $\rx_n\stackrel{d}{\longrightarrow} \rx$) if and only if 
$$
F_n(x) \stackrel{n\rightarrow \infty}{\longrightarrow} G(x),
$$
for all $x$ that are continuity points of $G$.
\end{definition}

\begin{definition}[Convergence in probability]\label{definition:convg_prob}\index{Convergence in probability}
A sequence of random variables $\{\rx_n\}$ is said to converge in probability to random variable $\ry$ as $n \rightarrow \infty$, denoted  $\rx_n \stackrel{p}{\longrightarrow} \ry$, if for any $\epsilon >0$, 
$$
\prob[\abs{\rx_n-\ry} > \epsilon] \stackrel{n\rightarrow \infty}{\longrightarrow}0.
$$
\end{definition}
From the definitions above, it is clear that convergence in probability is a stronger condition than convergence in distribution. There are several other types of convergence in probability theory, such as convergence in $r$-mean and almost sure convergence. However, these will not be discussed further here.

\begin{example}
Let $\{\rx_n\}$ be a sequence of random variables with 
$$
\rx_n = (-1)^n \rx, \qquad \prob[\rx=-1]=\prob[\rx=1]=\frac{1}{2}.
$$
Then $\rx_n \stackrel{d}{\rightarrow}\rx$, but $\rx_n \stackrel{p}{\nrightarrow}\rx$.
\end{example}

\begin{lemma}[Convergence results]\label{lemma:equivalence-of-distribution-and-prob-convergence}
Let $\{\rx_n\}$ be a sequence of random variables. Then it follows that
\begin{enumerate}[(i)]
\item  $\rx_n \stackrel{p}{\rightarrow} \rx \,\,\Longrightarrow\,\, \rx_n \stackrel{d}{\rightarrow}\rx$
\item  $	\rx_n \stackrel{d}{\rightarrow}c \,\,\Longrightarrow\,\, \rx_n \stackrel{p}{\rightarrow}c, \qquad c\in \real.$
\end{enumerate}
\end{lemma}
\begin{proof}[of Lemma~\ref{lemma:equivalence-of-distribution-and-prob-convergence}]
\textbf{(i).} 
let $x$ be any continuity point of $F_\rx$, and let $\epsilon >0$. We begin by writing
\begin{equation}\label{equation:dist-prob-equa1}
\begin{aligned}
\prob[\rx_n \leq x] &= \prob[\rx_n \leq x, \textcolor{mylightbluetext}{\abs{\rx_n - \rx}\leq \epsilon}] + \prob[\rx_n \leq x, \abs{\rx_n-\rx} > \epsilon] \\
&= \prob[\rx_n \leq x,\textcolor{mylightbluetext}{\rx_n-\epsilon \leq  \rx\leq \rx_n+\epsilon}] + \prob[\rx_n \leq x, \abs{\rx_n-\rx} > \epsilon] \\
&\leq \prob[\rx\leq x+\epsilon] + \prob[\abs{\rx_n-\rx}> \epsilon],
\end{aligned}
\end{equation}
where the inequality comes from the fact that $\{\rx\leq x+\epsilon\}$ contains $\{\rx_n \leq x,\rx_n-\epsilon \leq  \rx\leq \rx_n+\epsilon\}$. Moreover, it follows that 
$$
\begin{aligned}
\prob[\rx \leq x-\epsilon] &= \prob[\rx\leq x-\epsilon, \textcolor{mylightbluetext}{\abs{\rx_n-\rx}\leq \epsilon}] + \prob[\rx\leq x-\epsilon, \abs{\rx_n-\rx}> \epsilon] \\
&=\prob[\rx\leq x-\epsilon,\textcolor{mylightbluetext}{ \rx-\epsilon \leq  \rx_n\leq \rx+\epsilon}] + \prob[\rx\leq x-\epsilon, \abs{\rx_n-\rx}> \epsilon] \\
&\leq \prob[\rx_n\leq x] + \prob[ \abs{\rx_n-\rx}> \epsilon],
\end{aligned} 
$$
where the inequality comes from the fact that $\{\rx_n\leq x\}$ contains $\{\rx\leq x-\epsilon, \rx-\epsilon \leq  \rx_n\leq \rx+\epsilon\}$. This implies
\begin{equation}\label{equation:dist-prob-equa2}
\prob[\rx \leq x-\epsilon] - \prob[ \abs{\rx_n-\rx}> \epsilon] \leq \prob[\rx_n\leq x].
\end{equation}
Combining \eqref{equation:dist-prob-equa1} and \eqref{equation:dist-prob-equa2} yields that 
$$
\prob[\rx \leq x-\epsilon] - \prob[ \abs{\rx_n-\rx}> \epsilon] \leq \prob[\rx_n\leq x] \leq  \prob[\rx\leq x+\epsilon] + \prob[\abs{\rx_n-\rx}> \epsilon].
$$
Since we assume $\prob[\abs{\rx_n-\rx} > \epsilon] \stackrel{n\rightarrow \infty}{\longrightarrow}0$. Then, when $n\rightarrow \infty$, we have 
$$
\prob[\rx \leq x-\epsilon]  \leq \prob[\rx_n\leq x] \leq  \prob[\rx\leq x+\epsilon],
$$
which yields (i).

\paragraph{(ii).} We also have 
$$
\begin{aligned}
\prob[\abs{\rx_n - c}> \epsilon] 
&= \prob[\rx_n - c > \epsilon] + \prob[\rx_n - c < -\epsilon]\\
&= \prob[\rx_n  > c+\epsilon] + \prob[\rx_n< c-\epsilon]\\
&\leq  1-\prob[\rx_n  \leq c+\epsilon] + \prob[\rx_n\leq c-\epsilon]\\
&\stackrel{n\rightarrow \infty}{\longrightarrow} 1-F(c+\epsilon)+F(c-\epsilon).
\end{aligned}
$$
Since $\rx_n \stackrel{d}{\rightarrow}c$, we have $F(c+\epsilon)=1$ and $F(c-\epsilon)=0$ as $c>c+\epsilon$ and $c-\epsilon<c$. Therefore, 
$$
\prob[\abs{\rx_n - c}> \epsilon]\stackrel{n\rightarrow \infty}{\longrightarrow} 0,
$$
which completes the proof.
\end{proof}

\index{Joint convergence}
\begin{definition}[Joint convergence]
Let $\{\rvx_n\}$ be a sequence of random vectors of $\real^p$, and $\rvx$ be a random vector of $\real^p$. Define their distribution functions as 
$$
\begin{aligned}
F_{\rvx_n} (\bx) &= \prob[\rvx_{n1} \leq \bx_1, \rvx_{n2} \leq \bx_2, \ldots, \rvx_{np} \leq \bx_p];\\
F_{\rvx} (\bx)   &= \prob[\rvx_1 \leq \bx_1, \rvx_2 \leq \bx_2, \ldots, \rvx_p \leq \bx_p],
\end{aligned}
$$ 
for $\bx =[\bx_1, \bx_2, \ldots, \bx_p]^\top \in \real^p $. Then we say that $\rvx_n$ converges in distribution to $\rvx$ as $n\rightarrow \infty$ if for every continuity point of $F_{\rvx}$ we have
$$
F_{\rvx_n} \stackrel{n\rightarrow \infty}{\longrightarrow} F_{\rvx} (\bx).
$$
This convergence is denoted by $\rvx_n \stackrel{d}{\rightarrow}\rvx$.
\end{definition}

When an estimator approaches the true value as the sample size increases indefinitely, we refer to this property as consistency. In mathematical terms, this concept is defined as follows:
\index{Consistency}
\begin{definition}[Consistency]\label{definition:consist}
An estimator $\widehattheta_n$ of $\theta$, constructed on the basis of a sample of size $n$, is said to be \textit{consistent} if $\widehattheta_n  \stackrel{p}{\longrightarrow} \theta$ as $n\rightarrow \infty$.
\end{definition}

Consistency is a desirable attribute for an estimator. When the sample size is sufficiently large, the estimator will be very close to the true value with high probability. Additionally, the concentration of an estimator around the true parameter can always be bounded using the mean squared error.

\begin{lemma}\label{lemma:mse-inequality}
Let $\widehat{\btheta}$ be an estimator of $\btheta \in \real^p$ such that $\Var[\widehattheta_i] <\infty, \forall i \in \{1, 2, \ldots, p\}$. Then for all $\epsilon >0$, 
$$
\prob\left[\normtwobig{\widehat{\btheta} - \btheta} \geq \epsilon \right]\leq \frac{\MSE(\widehat{\btheta}, \btheta)}{\epsilon^2}.
$$
\end{lemma}
\begin{proof}[of Lemma~\ref{lemma:mse-inequality}]
Let $\rx \triangleq \normtwobig{\widehat{\btheta} - \btheta}^2$. Since $\epsilon >0$, applying Markov's inequality (see Section~\ref{section:inequalities}) yields
$$
\prob\left[\normtwobig{\widehat{\btheta} - \btheta}\geq\epsilon\right] = \prob[\rx \geq  \epsilon^2] \leq \frac{\Exp[\rx]}{\epsilon^2} = \frac{\Exp\left[\normtwobig{\widehat{\btheta} - \btheta}^2\right]}{\epsilon^2}=\frac{\MSE(\widehat{\btheta}, \btheta)}{\epsilon^2}.
$$
This completes the proof.
\end{proof}
\index{Biased estimator}
\index{Unbiased estimator}

Both biased and unbiased estimators can be consistent (an example can be found in Theorem~\ref{theorem:consistency-lse-noise-estimator}). 
Regarding mean squared error (MSE), both biased and unbiased estimators can achieve an MSE approaching zero as the sample size grows sufficiently large.

\begin{remark}[Consistency]
It is important to note that while the convergence of the MSE to zero implies consistency, the reverse implication does not generally hold.
\end{remark}

\index{Asymptotic theory}
\subsection*{Asymptotic Theory}
Five key tools, along with their extensions, play a central role in asymptotic theory.
These  include the weak law of large numbers (WLLN), the central limit theorem (CLT), the continuous mapping theorem (CMT), Slutsky's theorem, and the Delta method. 
We present the main results without proof; detailed proofs can be found, for example, in the references listed at the beginning of this section.

\index{Law of large numbers}
\begin{theoremHigh}[$L^2$ Weak Law of Large Numbers (WLLN2)]\label{theorem:l2weaklaw_large}
Let $\rx_1, \rx_2, \ldots, \rx_n$ be i.i.d. random variables with $\Exp[\rx_i]=\mu < \infty$ and $\Var[\rx_i]=\sigma^2 < \infty$. Let $\bar{\rx}_n = \frac{1}{n} \sum_{i=1}^{n}\rx_i$. Then, as $n\rightarrow \infty$,
$$
\bar{\rx}_n \stackrel{p}{\longrightarrow} \mu.
$$

\end{theoremHigh}
\begin{remark}[$L^1$ weak law of large numbers (WLLN1)]
Actually, the same conclusion can be drawn under weaker assumptions. Let $\rx_1, \rx_2, \ldots, \rx_n$ be i.i.d. random variables with  $\Exp[\rx_i]=\mu < \infty$. It suffices to assume that $\Exp[\abs{\rx_i}]< \infty$ rather than $\Var[\rx_i]< \infty$. 
Then, as $n\rightarrow \infty$,
$$
\bar{\rx}_n \stackrel{p}{\longrightarrow} \mu.
$$
This result extends naturally to the multivariate case. 
Let $\rvx_1, \rvx_2, \ldots, \rvx_n$ be i.i.d. random \textcolor{black}{vectors} with $\Exp[\rvx_i]=\bmu < \infty$ and $\Exp[\norm{\rvx_i}]< \infty$. Let $\bar{\rvx}_n = \frac{1}{n} \sum_{i=1}^{n}\rvx_i$. 
Then, as $n\rightarrow \infty$,
$$
\bar{\rvx}_n \stackrel{p}{\longrightarrow} \bmu.
$$
\end{remark}
The key observation in WLLN lies in the scaling by $\frac{1}{n}$, which reduces the variance of the sample mean to  $\sigma^2/n$. causing it to vanish as $n$ increases.
As a result, the sample mean converges in probability to the true mean $\mu$, or to the mean vector $\bmu$ in the multivariate setting.

\index{Central limit theorem}
\begin{theoremHigh}[Central limit theorem (CLT)]\label{theorem:clt_main}
Let $\rx_1, \rx_2, \ldots, \rx_n$ be i.i.d. random variables with $\Exp[\rx_i]=\mu < \infty$ and $\Var[\rx_i]=\sigma^2 < \infty$. Let $\bar{\rx}_n = \frac{1}{n} \sum_{i=1}^{n}\rx_i$. Then, as $n\rightarrow \infty$,
$$
\sqrt{n}(\bar{\rx}_n - \mu) \stackrel{d}{\longrightarrow} \normal(0, \sigma^2).
$$
Similarly, this result extends to the multivariate case. Let $\rvx_1, \rvx_2, \ldots, \rvx_n$ be i.i.d. random \textcolor{black}{vectors} with $\Exp[\rvx_i]=\bmu < \infty$ and $\Var[\rvx_i]=\bSigma$. Let $\bar{\rvx}_n = \frac{1}{n} \sum_{i=1}^{n}\rvx_i$. Then, as $n\rightarrow \infty$,
$$
\sqrt{n}(\bar{\rvx}_n - \bmu) \stackrel{d}{\longrightarrow} \normal(\bzero, \bSigma).
$$
\end{theoremHigh}
The CLT is stronger than the WLLN2 as $\sqrt{n}(\bar{\rx}_n - \mu) \stackrel{d}{\longrightarrow} \normal(0, \sigma^2)$ implies $\bar{\rx}_n \stackrel{p}{\longrightarrow} \mu$ (since $\bar{\rx}_n \stackrel{d}{\longrightarrow} \normal(\mu, \frac{1}{n}\sigma^2)$ in CLT). However, $\bar{\rx}_n \stackrel{p}{\longrightarrow} \mu$ does not provide any information about $\sqrt{n}\bar{\rx}_n$.

\begin{remark}\label{remark:asump_gau_chi}
Note that if a statistic $ \rs $ asymptotically follows a Gaussian distribution $ \rs \sim \normal(\Exp[\rs], \Var[\rs]) $, where $ \Exp[\rs] $ and $ \Var[\rs] $ are the expectation and variance of $ \rs $, respectively, then approximately we have:
$$
\frac{\rs - \Exp[\rs]}{\sqrt{\Var[\rs]}} \sim \normal(0, 1).
$$
According to the definition of the Chi-squared distribution (Definition~\ref{definition:chisquare_dist}), this implies
$$
\frac{(\rs - \Exp[\rs])^2}{\Var[\rs]} \sim \chi^2_{(1)}.
$$
If $ \rs $ is a vector $ \rvs = [\rs_1, \rs_2, \ldots, \rs_k]^\top \in\real^k$, the above conclusion can be written in vector form as:
$$
(\rvs - \Exp[\rvs])^\top \bV^{-1} (\rvs - \Exp[\rvs]) \sim \chi^2_{(k)},
$$
where $ \bV $ is the covariance matrix of $\rvs$ and must be nonsingular.
Note that $(\rvs - \Exp[\rvs])^\top \bV^{-1} (\rvs - \Exp[\rvs])$ is also called the square of \textit{Mahalanobis distance} between $\rvs$ and $\Exp[\rvs]$.
\end{remark}
\index{Mahalanobis distance}

\index{WCLT}
\index{Weighted sum central limit theorem}
In addition to the standard CLT, the following theorem presents a more general version of the central limit theorem that will often be useful.
\begin{theoremHigh}[Weighted sum central limit theorem (WCLT)]\label{theorem:weighted-sum-clt}
Let $\{\rx_n \}$ be a sequence of i.i.d. real random variables with $\Exp[\rx_i]=0$ and $\Var[\rx_i]=1$. Let $\{\eta_n\}$ be a sequence of real constants. If 
$$
\mathop{\sup}_{1\leq i\leq n} \frac{\eta_i^2}{\sum_{j=1}^{n}\eta_j^2} \stackrel{n\rightarrow \infty}{\longrightarrow}0,
$$
then
$$
\frac{1}{\sqrt{\sum_{i=1}^{n}\eta_i^2}} \sum_{i=1}^{n}\eta_i \rx_i  \stackrel{d}{\longrightarrow} \normal(0, 1).
$$
\end{theoremHigh}

\index{Continuous mapping theorem}
\index{CMT}
\begin{theoremHigh}[Continuous mapping theorem (CMT)]\label{theorem:cont_mapthep}
Given a sequence of random variables $\{\rx_n\}$. If $\rx$ is a random variable such that $\prob[\rx \in \mathcal{A}] = 1$ and $g: \real \rightarrow \real$ is continuous everywhere on $\mathcal{A}$, then,
$$
\begin{aligned}
	\rx_n &\stackrel{d}{\longrightarrow} \rx \,\,\Longrightarrow\,\, g(\rx_n) \stackrel{d}{\longrightarrow} g(\rx); \\
	\rx_n &\stackrel{p}{\longrightarrow} \rx \,\,\Longrightarrow\,\, g(\rx_n) \stackrel{p}{\longrightarrow} g(\rx).
\end{aligned}
$$
Similarly, the result can be extended to the multi-dimensional case. Given a sequence of random vectors $\{\rvx_n\}$. If $\rvx$ is a random vector such that $\prob[\rvx \in \mathcal{A}] = 1$ and $g: \real^p \rightarrow \real^k$ is continuous everywhere on $\mathcal{A}$, then, 
$$
\begin{aligned}
	\rvx_n &\stackrel{d}{\longrightarrow} \rvx \,\,\Longrightarrow\,\, g(\rvx_n) \stackrel{d}{\longrightarrow} g(\rvx); \\
	\rvx_n &\stackrel{p}{\longrightarrow} \rvx \,\,\Longrightarrow\,\, g(\rvx_n) \stackrel{p}{\longrightarrow} g(\rvx).
\end{aligned}
$$
\end{theoremHigh}
The CMT was proved by \citet{mann1943stochastic} and is sometimes referred to as the \textit{Mann-Wald Theorem}. Note that the CMT allows the function $g$ to be discontinuous but the probability of being at a discontinuity point is zero. For example, the function $g(u) = \frac{1}{u}$ is discontinuous at $u=0$. But if $\rx_n \stackrel{d}{\longrightarrow} \rx \sim \normal(0,1)$, then $\prob[\rx=0]=0$ such that $\rx_n^{-1} \stackrel{d}{\longrightarrow} \rx^{-1}$.  

\index{Slutsky's theorem}
\begin{theoremHigh}[Slutsky's theorem]
Let $\rx$ be a random variable and $c\in \real$ be a constant. Suppose $\rx_n \stackrel{d}{\rightarrow}\rx$ and $\ry_n \stackrel{d}{\rightarrow}c$ (i.e., $\ry_n \stackrel{p}{\rightarrow}c$). Then, it follows that 
$$
\begin{aligned}
	\rx_n+\ry_n &\stackrel{d}{\rightarrow} \rx+c;\\
	\rx_n\ry_n &\stackrel{d}{\rightarrow} c\rx; \\
	\ry_n^{-1}\rx_n &\stackrel{d}{\rightarrow} c^{-1}\rx,
\end{aligned}
$$
when $c\neq 0$, as $n\rightarrow \infty$. (Note here, $\rx_n$, $\ry_n$, $\rx$, and $c$ can be understood as vectors or matrices, provided that all operations are well-defined.)
\end{theoremHigh}

Note that one cannot replace the constant $c$ with a non-degenerate random variable (or random vector, matrix) in Slutsky's theorem. For example, take $\rx_n = -\rz + n^{-1}$ and $\ry_n = \rz-n^{-1}$, for $\rz\sim \normal(0,1)$. Then, $\rx_n\stackrel{d}{\rightarrow} \rz$ (since $-\rz \sim \normal(0,1)$), $\ry_n \stackrel{p}{\rightarrow} \rz$. But for all $n$, we have $\rx_n + \ry_n = 0$; and thus, $\rx_n + \ry_n$ fails to converge in distribution to $2\rz$.

An important application of Slutsky's theorem is given below:
\begin{example}
Suppose $\rvx_n \stackrel{d}{\rightarrow} \normal(\bzero, \bSigma)$ and $\rvy_n \stackrel{p}{\rightarrow}\bSigma$. Then $\rvy_n^{-1/2} \rvx_n  \stackrel{d}{\rightarrow} \bSigma^{-1/2} \normal(\bzero, \bSigma) = \normal(\bzero, \bI)$.
\end{example}

\begin{theoremHigh}[General version of Slutsky's theorem]
Let $\rx$ be a random variable and $c\in \real$ be a constant. Suppose $\rx_n \stackrel{d}{\rightarrow}\rx$ and $\ry_n \stackrel{d}{\rightarrow}c$ (i.e., $\ry_n \stackrel{p}{\rightarrow}c$). Let further that $g:\real \times \real$ be a continuous function. Then, it follows that 
$$
\begin{aligned}
	g(\rx_n,\ry_n) &\stackrel{d}{\rightarrow} g(\rx,c)
\end{aligned}
$$
as $n\rightarrow \infty$. (Note here, $\rx_n$, $\ry_n$, $\rx$, and $c$ can be understood as vectors or matrices as long as the operations are compatible.)
\end{theoremHigh}

\index{The Delta method}
The Delta method is a direct consequence of Slutsky's theorem and CMT. 
It allows us to apply transformations to results obtained from the central limit theorem.
\begin{theoremHigh}[The Delta method]
Let $\rz_n = a_n(\rx_n-\theta) \stackrel{d}{\rightarrow} \rz$, where $a_n, \theta \in \real$ for all $n$ and $a_n \uparrow \infty$. Let $g:\real \rightarrow \real$ be differentiable at $\theta$. Then, $a_n(g(\rx_n) - g(\theta)) \stackrel{d}{\rightarrow} g'(\theta)\rz$, provided that $g'(\theta) \neq 0$.
\end{theoremHigh}
In most applications, $a_n$ in the  theorem above is  constructed to be $\sqrt{n}$ such that $\sqrt{n} \uparrow \infty$ as $n\rightarrow \infty$. The Delta method implies that, asymptotically, the randomness in a transformation of $\rx_n$ is completely controlled by that in $\rx_n$.

\begin{theoremHigh}[The Delta method, multi-dimensional case]\label{theorem:multi-delta-method}
Let $\rvz_n = a_n(\rvx_n-\btheta) \stackrel{d}{\rightarrow} \rvz$, where $a_n\in \real$, $\btheta \in \real^d$ for all $n$ and $a_n \uparrow \infty$. Let $g:\real^d \rightarrow \real^p$ be continuously differentiable at $\btheta$. Then,
$$
a_n(g(\rvx_n) - g(\btheta)) \stackrel{d}{\rightarrow} J_g(\btheta)\rvz,
$$ 
where $J_g(\btheta)$ is the $p\times d$ Jacobian matrix of $g$, 
$$
J_g(\btheta) = 
\begin{bmatrix}
	\frac{\partial}{\partial x_1 }g_1(\btheta)& \frac{\partial}{\partial x_2 }g_1(\btheta)&  \ldots& \frac{\partial}{\partial x_d }g_1(\btheta) \\
	\frac{\partial}{\partial x_1 }g_2(\btheta)& \frac{\partial}{\partial x_2 }g_2(\btheta)&  \ldots& \frac{\partial}{\partial x_d }g_2(\btheta) \\
	\vdots & \vdots & \vdots &\vdots \\
	\frac{\partial}{\partial x_1 }g_p(\btheta)& \frac{\partial}{\partial x_2 }g_p(\btheta)&  \ldots& \frac{\partial}{\partial x_d }g_p(\btheta) \\
\end{bmatrix}.
$$
\end{theoremHigh}
\begin{proof}[of Theorem~\ref{theorem:multi-delta-method}]
By Taylor's expansion (Theorem~\ref{theorem:taylo_exp}) around $\btheta$, we have 
$$
g(\rvx_n) = g(\btheta) + J_g(\btheta_n^\ast) (\rvx_n-\btheta),
$$
where $\btheta_n^\ast$ lies between $\rvx_n$ and $\btheta$ such that 
$$
\abs{\btheta_n^\ast -\btheta} <\abs{\rvx_n-\btheta} = \abs{a_n^{-1}}\cdot\abs{a_n (\rvx_n-\btheta)} 
=  \abs{a_n^{-1}}\cdot \abs{\rvz_n} \Stackp \bzero,
$$
where the convergence follows from  Slutsky's theorem. Therefore, we have $\btheta_n^\ast \Stackp \btheta$. By applying CMT, we have $J_g(\btheta_n^\ast) \Stackp J_g(\btheta)$. Therefore, 
$$
\begin{aligned}
\rvz_n = a_n(g(\rvx_n)-g(\btheta)) &=  a_n J_g(\btheta_n^\ast) (\rvx_n-\btheta)\\
&= J_g(\btheta_n^\ast) a_n(\rvx_n-\btheta) \Stackd J_g(\btheta) \rvz,
\end{aligned}
$$
where the convergence follows from  Slutsky's theorem.
\end{proof}

\begin{example}
Suppose $\rx_1, \rx_2, \ldots, \rx_n$ are i.i.d. random variables with mean $\mu$ and variance $\sigma^2< \infty$. Let $\bar{\rx}_n = \frac{1}{n} \sum_{i=1}^{n}\rx_i$. By CLT, we have
$$
\sqrt{n}(\bar{\rx}_n - \mu) \stackrel{d}{\longrightarrow} \normal(0, \sigma^2).
$$
Given a continuously differentiable function $g(\cdot)$, the Delta method implies that 
$$
\sqrt{n}(g(\bar{\rx}_n) - g(\mu)) \stackrel{d}{\longrightarrow} \normal(0, \sigma^2 (g'(\mu))^2).
$$
Suppose further that $\{\ry_n\}$ is a sequence of random variables such that $\ry_n \Stackp \sigma$. Then, by Slutsky's theorem, we obtain
$$
\sqrt{n}\left(\frac{g(\bar{\rx}_n) - g(\mu)}{\ry_n}\right) \stackrel{d}{\longrightarrow} \normal(0,  (g'(\mu))^2).
$$
\end{example}

\index{Cram\'er-Wold device}
Apart from the five basic weapons, the following Cram\'er-Wold device is also very useful.
\begin{theoremHigh}[Cram\'er-Wold device]\label{theorem:cramer-rao-device}
Let $\{\rvx_n\}$ be a sequence of random vectors in $\real^p$, and $\rvx$ be a random vector in $\real^p$. Then,
$$
\rvx_n \stackrel{d}{\rightarrow}\rvx\qquad  \text{if and only if} \qquad \bu^\top \rvx_n \stackrel{d}{\rightarrow} \bu^\top \rvx,\ \forall \, \bu\in \real^p.
$$
\end{theoremHigh}

\subsection{Assumptions Restated}
For large sample asymptotic results of least squares estimators, we restate the assumptions as follows:
\begin{assumption}
We assume that 
\item OLS.0 (random sampling): ($\ry_i, \rvx_i$), $i=1,2,\ldots, n$ are i.i.d.
\item OLS.1 (full rank): $\rank(\bX) = p$.
\item \qquad  OLS.$1.2$ $\rank(\Exp[\rvx\rvx^\top] = p)$.
\item OLS.2 (first moment): $\Exp[\ry\mid \rvx] = \rvx^\top \bbeta$, i.e., $\ry = \rvx^\top \bbeta + \epsilon$ and $\Exp[\epsilon \mid \rvx] = 0$.
\item \qquad  OLS.$2.2$  $\ry = \rvx^\top \bbeta+\epsilon$ with $\Exp[\epsilon\rvx] = \bzero$.
\item OLS.3 (second moment): $\Exp[\epsilon_i^2] < \infty$, where $\epsilon_i$ is the noise associated with the input $\rvx_i$.
\item  \qquad OLS.$3.2$ (homoskedasticity): $\Exp[\epsilon_i^2 \mid  \rvx_i] = \sigma^2$.
\item OLS.4 (normality): $p(\epsilon \mid  \rvx) \sim \normal(0,\sigma^2)$.
\end{assumption}

Assumption OLS.2 is equivalent to stating $\ry = \rvx^\top \bbeta + \epsilon$ (linear in parameters) and $\Exp[\epsilon \mid \rvx]=0$ (zero conditional mean). 
And Assumption OLS.2 is stronger than Assumption OLS2.2.
Moreover, Assumption OLS.3.2 is stronger than Assumption OLS.3 since OLS3.2 implies the independence between $\epsilon_i$ and $\rvx_i$. The linear model under Assumption OLS.3.2 is called the \textit{homoskedastic linear regression model}~\footnote{
The term ``homoskedastic" is a concept from statistics and econometrics that refers to the condition where the variance of the error terms (or residuals) in a regression model is constant across all levels of the independent variables. The Greek roots of the word provide insight into its meaning: ``homos"  means ``same" and ``skedastic" means ``scatter" or ``spread."
So, when combined, ``homoskedastic" literally translates to ``same scatter" or ``same spread." This indicates that the variability or dispersion of the errors remains consistent throughout the range of the data.
}. 
In most of our discussions in the previous sections, we assume OLS.0 (with fixed $\bx_i$'s), OLS.1, OLS.2, OLS.3.2, OLS.4. That is
$$
\begin{aligned}
\ry &= \rvx^\top \bbeta + \epsilon ,\\
\Exp[\epsilon\mid  \rvx] &= 0, \\
\Exp[\epsilon^2 \mid  \rvx] &= \sigma^2, \\
p(\epsilon \mid  \rvx) &\sim \normal(0,\sigma^2).
\end{aligned}
$$
However, if $\Exp[\epsilon^2\mid  \rvx] = \sigma^2(\rvx)$, i.e., the noise variance depends on the input $\rvx$, then the errors are said to be  \textit{heteroskedastic}.

\index{Heteroskedastic}
\index{Homoskedastic}

\subsection{Asymptotics for the OLS  Estimator}\label{section:asymp_ols}

\index{Fisher information}
\subsection*{Asymptotic distribution of MLE}
To study the asymptotic behavior of the OLS estimator, we first examine the asymptotic properties of the MLE in general.
Let $\widehattheta$ be the MLE of $\theta$. 
To avoid confusion, let the true value of $\theta$ be denoted by $\theta_0$. 
We shall show that as the sample size $n$ becomes large, the distribution of the MLE $\widehattheta$ is approximately normal with mean $\theta_0$ and variance $1/n\fisher(\theta_0)$, where $\fisher(\theta_0)$ is the Fisher information evaluated at $\theta_0$. 
Since this result holds only in the limit as $n\rightarrow \infty$, we say that the MLE is \textit{asymptotically unbiased}, and we refer to the variance of the limiting normal distribution as the \textit{asymptotic variance} of the MLE. More precisely, we have the following theorem:

\begin{theoremHigh}[The asymptotic distribution of MLE]\label{theorem:asymp_of_mle}
Let $\rx_1, \rx_2, \ldots, \rx_n$ be a sample of size $n$ from a distribution for which the p.d.f. or p.m.f. is $p(x\mid \theta)$, where $\theta$ is an unknown parameter. 
Assume that the true value of $\theta$ is $\theta_0$, and the MLE of $\theta$ is $\widehattheta$. Then the probability distribution of $\sqrt{n \fisher(\theta_0)} (\widehattheta - \theta_0)$ converges to a standard normal distribution as $n\rightarrow \infty$. 
In other words, the asymptotic distribution of $\widehattheta$ is
$$
\normal \left( \theta_0, \frac{1}{n \fisher(\theta_0)} \right),
$$
where $\fisher(\cdot)$ denotes the Fisher information; see Section~\ref{section:mvu_esti}. 
\end{theoremHigh}
\begin{proof}[of Theorem~\ref{theorem:asymp_of_mle}]
We shall prove that
$
\sqrt{n \fisher(\theta_0)} (\widehattheta - \theta_0) \sim \normal(0, 1)
$
asymptotically. Here, we provide only a sketch of the proof; the full details are beyond the scope of this book.

Recall that  the log-likelihood function is
$
\ell(\theta) \triangleq \ell(\theta; \bx)= \sum_{i=1}^{n} \ln p(x_i\mid \theta)
$
and the MLE $\widehattheta$ satisfies $\ell'(\widehattheta) = 0$. We apply the linear approximation theorem of  $\ell'(\widehattheta)$ at the point $\theta_0$ (Theorem~\ref{theorem:linear_approx}), yielding
$
0 = \ell'(\widehattheta) \approx \ell'(\theta_0) + (\widehattheta - \theta_0) \ell''(\theta_0)
$.
Therefore, we have
$$
\widehattheta - \theta_0 \approx \frac{-\ell'(\theta_0)}{\ell''(\theta_0)}
\qquad \text{and}\qquad 
\sqrt{n} (\widehattheta - \theta_0) \approx \frac{-n^{-1/2} \ell'(\theta_0)}{n^{-1} \ell''(\theta_0)}.
$$
Now consider the numerator of this expression. Its expectation is
$$
\Exp[-n^{-1/2} \ell'(\theta_0)] 
= n^{-1/2} \sum_{i=1}^{n} \Exp\left[ \frac{\partial}{\partial \theta} \ln p(x_i\mid \theta_0) \right] 
= n^{-1/2} \sum_{i=1}^{n} \Exp\left[ \ell'(\theta_0; \rx_i) \right] = 0,
$$
and its variance is
$$
\Var[-n^{-1/2} \ell'(\theta_0)] 
= \frac{1}{n} \sum_{i=1}^{n} \Exp\left[ \Big(\frac{\partial}{\partial \theta} \ln p(x_i\mid \theta_0)\Big)^2  \right]
= \frac{1}{n} \sum_{i=1}^{n} \Exp\left[\big(\ell'(\theta_0; \rx_i)\big)^2 \right]
= \fisher(\theta_0).
$$
By definition, the denominator is $\frac{1}{n} \ell''(\theta_0) = \frac{1}{n} \sum_{i=1}^{n} \frac{\partial^2}{\partial \theta^2} \ln p(x_i\mid \theta_0)$.
By the law of large numbers (Theorem~\ref{theorem:l2weaklaw_large}), this expression converges to
$$
\Exp\left[ \frac{\partial^2}{\partial \theta^2} \ln p(\rx_i\mid \theta_0) \right] = -\fisher(\theta_0).
$$
Thus, we can write
$$
\sqrt{n} (\widehattheta - \theta_0) \approx \frac{-n^{-1/2} \ell'(\theta_0)}{\fisher(\theta_0)}.
$$
Therefore, it holds that 
$$
\begin{aligned}
\Exp\left[ \sqrt{n} (\widehattheta - \theta_0) \right]  &\approx \frac{\Exp[n^{-1/2} \ell'(\theta_0)]}{\fisher(\theta_0)} = 0; \\
\Var \left[ \sqrt{n} (\widehattheta - \theta_0) \right] &\approx \frac{\Var[n^{-1/2} \ell'(\theta_0)]}{\fisher^2(\theta_0)} = \frac{\fisher(\theta_0)}{\fisher^2(\theta_0)} = \frac{1}{\fisher(\theta_0)}.
\end{aligned}
$$
As $n \to \infty$, applying central limit theorem (Theorem~\ref{theorem:clt_main}), we have
$$
\sqrt{n} (\widehattheta - \theta_0) \sim \normal \left( 0, \frac{1}{\fisher(\theta_0)} \right)
\qquad \iff \qquad 
\sqrt{n \fisher(\theta_0)} (\widehattheta - \theta_0) \sim \normal(0, 1).
$$
This completes the proof.
\end{proof}

This result shows that the MLE is asymptotically optimal, because its asymptotic variance reaches the CRLB; see Theorem~\ref{theorem:crlb}. For this reason, MLE is frequently used especially with large samples.
For a multivariate problem, similar to the scalar case, the asymptotic distribution of the MLE $\widehat{\btheta}_{\text{MLE}}$ is approximately multivariate Gaussian distribution with the true value of $\btheta_0$ as the mean and $[n\fisher(\btheta_0)]^{-1}$ as the covariance matrix.
Once again,  $\widehatbbeta = (\bX^\top \bX)^{-1} \bX^\top \by$ stands the maximum likelihood estimate of the OLS problem.
This further illustrates why the method of ordinary least squares plays such a central role in linear model estimation.

\subsection*{Consistency of OLS}

\index{Consistency}
We now show that the OLS estimator is consistent under relatively mild conditions.
\begin{theoremHigh}[Consistency of OLS estimator]\label{theorem:consistency-lse}
Let $\{\rmX_m\}$ be a sequence of $n\times p$ design matrices, and $\{\bepsilon_m\}$ be a sequence of $n\times 1$ vectors. For each element, let $\rvy_m = \rmX_m\bbeta + \bepsilon_m$, and the OLS estimator is obtained by $\widehatbbeta_m = (\rmX_m^\top \rmX_m)^{-1} \rmX_m^\top \rvy_m$. Suppose the model satisfies the following assumptions:
\begin{enumerate}[(i)]
\item  OLS.0 (random sampling): ($\ry_i, \rvx_i$), $i=1,2,\ldots, n$ are i.i.d.
\item  OLS.1.2 (full rank): $\rank(\Exp[\rvx\rvx^\top] = p)$.
\item  OLS.2.2: $\ry = \rvx^\top \bbeta+\epsilon$ with $\Exp[\epsilon\rvx] = \bzero$. 
\item  OLS.3 (second moment): $\Exp[\epsilon_i^2] < \infty$, where $\epsilon_i$ is the noise associated with input $\rvx_i$.
\end{enumerate}
\item Then the least squares estimator satisfies
$\widehatbbeta_m \Stackp \bbeta$, i.e., $\widehatbbeta_m$ is consistent.
\end{theoremHigh}
The assumptions OLS2.2 and OLS.3 in this context are relatively mild compared to those used earlier when deriving the OLS estimator $\widehatbbeta$. 
A stronger set of  assumptions, such as OLS.2 and OLS.3.2, would also lead to the same consistency result.
\begin{proof}[of Theorem~\ref{theorem:consistency-lse}]
We begin by expressing the OLS estimator as:
$$
\begin{aligned}
\widehatbbeta_m &= (\rmX_m^\top \rmX_m)^{-1}\rmX_m^\top \rvy_m = (\rmX_m^\top \rmX_m)^{-1} \rmX_m^\top (\rmX_m\bbeta + \bepsilon_m)
= \bbeta + (\rmX_m^\top \rmX_m)^{-1} \rmX_m^\top\bepsilon_m.
\end{aligned}
$$
To prove consistency, it suffices to show that the second term converges in probability to zero: $(\rmX_m^\top \rmX_m)^{-1} \rmX_m^\top\bepsilon_m \Stackp \bzero $.

For $\rmX_m^\top \rmX_m $, since $\rvx_i$'s are i.i.d. (OLS.0),
$\rmX_m^\top \rmX_m = \frac{1}{n} \sum_{i=1}^{n} \rvx_i \rvx_i^\top \Stackp \Exp[\rvx_i \rvx_i^\top] $~\footnote{Note again we let $\rvx_i$ be the $i$-th row of $\rmX_m$.} as $\Exp[\norm{\rvx_i}^2] \leq \infty$ implied in OLS.1.2 and by applying WLLN1.  This implies $(\rmX_m^\top \rmX_m)^{-1} \Stackp \Exp[\rvx_i \rvx_i^\top]^{-1}$.

For $\rmX_m^\top \bepsilon_m$, similarly, since $\rvx_i$'s are i.i.d. (OLS.0), we have $\rmX_m^\top \bepsilon_m = \frac{1}{n} \sum_{i=1}^{n} \epsilon_i\rvx_i  \Stackp \Exp[\epsilon_i\rvx_i] $ by applying WLLN1 if we have $\Exp[\norm{\epsilon\rvx}] < \infty$. To see why  $\Exp[\norm{\epsilon\rvx}] < \infty$, we have 
$$
\Exp[\norm{\epsilon\rvx}] \leq \Exp\left[\norm{\rvx}^2\right]^{1/2} \Exp\left[|\epsilon|^2\right]^{1/2} <\infty,
$$
by Cauchy-Schwarz inequality (see Section~\ref{section:inequalities}), and which is finite by Assumption OLS1.2 and OLS.3. Then, $(\rmX_m^\top\rmX_m)^{-1} \rmX_m^\top \bepsilon_m$ can be expressed as
$$
\begin{aligned}
(\rmX_m^\top\rmX_m)^{-1} \rmX_m^\top \bepsilon_m &= \left( \sum_{i=1}^{n} \rvx_i \rvx_i^\top \right)^{-1} \left( \frac{1}{n} \sum_{i=1}^{n} \epsilon_i\rvx_i \right) 
\Stackp \Exp[\rvx_i \rvx_i^\top]^{-1} \bzero = \bzero,
\end{aligned}
$$
by applying Slutsky's theorem. This completes the proof.
\end{proof}

\index{Slutsky's theorem}
\index{Consistency}
\begin{theoremHigh}[Consistency of noise estimators]\label{theorem:consistency-lse-noise-estimator}
Let $\{\rmX_m\}$ be a sequence of $n\times p$ design matrices, and $\{\bepsilon_m\}$ be a sequence of $n\times 1$ vectors. For each element, let $\rvy_m = \rmX_m\bbeta + \bepsilon_m$. 
The maximum likelihood estimator of the noise variance is obtained by 
$$
\widehat{\sigma}^2_m=  \frac{1}{n} \rve_m^\top\rve_m = \frac{1}{n} (\rvy_m - \rmX_m\widehatbbeta_m)^\top (\rvy_m - \rmX_m\widehatbbeta_m), \qquad \text{(Theorem~\ref{theorem:mle-gaussian})}
$$
and an unbiased estimator of the noise variance is given by
$$
\rS^2_m = \frac{1}{n-p} \rve_m^\top\rve_m =  \frac{1}{n-p} (\rvy_m - \rmX_m\widehatbbeta_m)^\top (\rvy_m - \rmX_m\widehatbbeta_m). \qquad \text{(Theorem~\ref{theorem:ss-chisquare})}
$$ 
Now suppose that the linear regression model satisfies the same assumptions as in Theorem~\ref{theorem:consistency-lse}, along with the following additional condition:
\item \gap (v). OLS.$3.2$ (homoskedasticity): $\Exp[\epsilon_i^2 \mid  \rvx_i] = \sigma^2$.
\item Then, both estimators of the noise variance are consistent:
$\widehat{\sigma}^2_m \Stackp \sigma^2$ and $\rS^2_m  \Stackp \sigma^2$, which means $\widehat{\sigma}^2_m$ and $\rS^2_m$ are consistent estimators of $\sigma^2$.
\end{theoremHigh}
\begin{proof}[of Theorem~\ref{theorem:consistency-lse-noise-estimator}]
We begin by expressing the residual vector as
$$
\rve_m = \rvy_m - \rmX_m\widehatbbeta_m = (\bepsilon_m + \rmX_m \bbeta) - \rmX_m\widehatbbeta_m = \bepsilon_m + \rmX_m(\bbeta - \widehatbbeta_m).
$$
Thus, it follows that
$$
\begin{aligned}
\widehat{\sigma}^2_m 
&= \frac{1}{n}\rve_m^\top \rve_m 
= \frac{1}{n} \left\{\bepsilon_m^\top \bepsilon_m + 2\bepsilon_m^\top\rmX_m(\bbeta - \widehatbbeta_m) + (\bbeta - \widehatbbeta_m)^\top \rmX_m^\top \rmX_m(\bbeta - \widehatbbeta_m) \right\} \\
&= \frac{1}{n} \sum_{i=1}^{n} \epsilon_i^2 + 2 \left(\frac{1}{n}\sum_{i=1}^{n} \epsilon_i\rvx_i^\top\right)(\bbeta - \widehatbbeta_m) +  (\bbeta - \widehatbbeta_m)^\top  \left(\frac{1}{n} \sum_{i=1}^{n}\rvx_i\rvx_i^\top \right) (\bbeta - \widehatbbeta_m)  \\
&\Stackp \sigma^2,
\end{aligned}
$$
by applying the WLLN1, Theorem~\ref{theorem:consistency-lse}, and  Slutsky's theorem.

For $\rS_m^2$, it follows that 
$
\rS_m^2 = \frac{n}{n-p}\widehat{\sigma}^2_m \Stackp \sigma^2
$
by  Slutsky's theorem.
\end{proof}

From the  above theorem, we find two different estimators---one biased and one unbiased---can  both be consistent. That is, when sample size $n$ is sufficiently large, the estimates $\widehat{\sigma}^2_m$ and $\rS_m^2$ become very close to each other and to the true value $\sigma^2$.

As previously noted, unbiasedness is not a necessary condition for consistency. This result highlights that even though $\widehat{\sigma}^2_m$ is biased in finite samples, it still converges to the correct parameter value as $n$ increases. Similarly, although $\rS_m^2$ is unbiased, its consistency also depends on the behavior of the estimator $\widehatbbeta_m$ and the law of large numbers.

\index{Asymptotic normality}
\subsection*{Asymptotic Normality under Noise Moment Assumption}
To study the approximate sampling distribution of the OLS estimator under the moment assumption, we introduce the following additional assumption:
\begin{description}\centering
\item Assumption OLS.5: $\mathop{\max}_{1\leq i\leq n}\left\{ \rvx_i^\top (\rmX_m^\top\rmX_m)^{-1}\rvx_i \right\}\stackrel{n\rightarrow \infty}{\longrightarrow}0$.
\end{description}
This indicates that the diagonal elements of the hat matrix $\bH_m=\bX_m(\bX_m^\top\bX_m)^{-1}\bX_m^\top$ converge to zero.

Under the moment assumption, we only assume that the noise vector satisfies $\Exp[\bepsilon] = \bzero$ and $\Cov[\bepsilon]=\sigma^2\bI$, rather than assuming normality:  $\bepsilon \sim \normal(\bzero, \sigma^2 \bI)$.
We proved that the OLS estimator of $\bbeta$ is unbiased such that $\Exp[\widehatbbeta] = \bbeta$ and $\Exp[\rve]=\bzero$ in Section~\ref{section:unbiasedness-moment-assumption}, and the sampling distribution of $\widehatbbeta$ is $\widehatbbeta \sim \normal(\bbeta, \sigma^2(\bX^\top \bX)^{-1})$ in Theorem~\ref{theorem:samplding_dist_lse_gaussian}  if we assume further that the noise follows from a Gaussian distribution: $\bepsilon \sim \normal(\bzero, \sigma^2 \bI)$.
However, what can we say about the sampling distribution of $\widehatbbeta$ when the noise vector is not normally distributed?

We now show that, under certain regularity conditions (including Assumption OLS.5), we can approximate the sampling distribution of the OLS estimator even without assuming Gaussian errors.
\begin{theoremHigh}[Large sample distribution of $\widehatbbeta_m$]\label{theorem:larg-sample-hat-beta}
Let $\{\rmX_m\}$ be a sequence of $n\times p$ design matrices, and $\{\bepsilon_m\}$ be a sequence of $n\times 1$ vectors. For each element, let $\rvy_m = \rmX_m\bbeta + \bepsilon_m$. If
\begin{enumerate}[(i)]
\item  OLS.0 (random sampling): ($\ry_i, \rvx_i$), $i=1,2,\ldots, n$ are i.i.d.

\item  OLS.1.2: $\rmX_m$ has full rank $p$ for all $m\geq 1$.

\item  OLS.2 and OLS.3, the moment assumption: $\Exp[\bepsilon_m] = \bzero$ and $\Cov[\bepsilon_m]=\sigma^2\bI_m$ for all $m\geq 1$. That is, for each $i$-th element $\epsilon_{mi}$, we have $\Exp[\epsilon_{mi}] = 0$ and $\Var[\epsilon_{mi}]=\sigma^2$.

\item  OLS.5: $\mathop{\max}_{1\leq i\leq n}\left\{ \rvx_i^\top (\rmX_m^\top\rmX_m)^{-1}\rvx_i \right\}\stackrel{n\rightarrow \infty}{\longrightarrow}0$.
\end{enumerate}

\item Then the ordinary least squares estimator $\widehatbbeta_m = (\rmX_m^\top\rmX_m)^{-1}\rmX_m^\top\rvy_m$ satisfies 
$$
(\rmX_m^\top\rmX_m)^{1/2}(\widehatbbeta_m-\bbeta) \stackrel{d}{\longrightarrow} \normal(\bzero, \sigma^2\bI_p),
$$
where $\bI_p$ is a $p\times p$ identity matrix.
\end{theoremHigh}
\begin{proof}[of Theorem~\ref{theorem:larg-sample-hat-beta}]
Recall that $\widehatbbeta_m = (\rmX_m^\top\rmX_m)^{-1}\rmX_m^\top (\rmX_m \bbeta +\bepsilon_m)$. 
Then, we have $(\rmX_m^\top\rmX_m)^{1/2}(\widehatbbeta_m - \bbeta) = (\rmX_m^\top\rmX_m)^{-1/2}\rmX_m^\top \bepsilon_m$.
Let $\bu\in\real^p$ be a unit-length vector, and define the vector $\boldeta_m \in \real^n$ such that  
$$
\begin{aligned}
\boldeta_m &= [\eta_{m1}, \eta_{m2}, \ldots, \eta_{mn}]^\top \\
&= [\bu^\top(\rmX_m^\top\rmX_m)^{-1/2}\rvx_1, \ldots \bu^\top (\rmX_m^\top\rmX_m)^{-1/2}\rvx_n]^\top\\
&= \rmX_m (\rmX_m^\top\rmX_m)^{-1/2} \bu,
\end{aligned}
$$
where $\rvx_i$ is the $i$-th row of $\rmX_m$ for $i\in\{1,2,\ldots,n\}$.
Then, we have 
$$
\boldeta_m^\top\boldeta_m =  \bu^\top (\rmX_m^\top\rmX_m)^{-1/2} (\rmX_m^\top\rmX_m) (\rmX_m^\top\rmX_m)^{-1/2} \bu=1.
$$
That is, $\boldeta_m$ is  a unit-length random vector.
Moreover, according to Schwarz matrix inequality (see Section~\ref{section:inequalities}), we have
$$
\eta_{mi}^2 \leq \normtwo{\bu}^2 \normtwo{(\rmX_m^\top\rmX_m)^{-1/2}\rvx_i}^2 = \rvx_i^\top (\rmX_m^\top\rmX_m)^{-1}\rvx_i.
$$
Therefore, by assumption:
$$
\mathop{\max}_{1\leq i\leq n}\frac{ \eta_{mi}^2}{\sum_{j=1}^{j}\eta_{mj}^2} =\mathop{\max}_{1\leq i\leq n}\frac{ \eta_{mi}^2}{\boldeta_m^\top\boldeta_m} \leq \mathop{\max}_{1\leq i\leq n}\rvx_i^\top  (\rmX_m^\top\rmX_m)^{-1}\rvx_i \stackrel{n\rightarrow \infty}{\longrightarrow}0.
$$
Thus, by weighted sum central limit theorem in Theorem~\ref{theorem:weighted-sum-clt}, it follows that
$$
\sum_{i=1}^{n}\eta_{mi} \epsilon_{mi} = \boldeta_m^\top \bepsilon_m \stackrel{d}{\longrightarrow} \normal(0, \sigma^2).
$$
That is,
$$
\bu^\top (\rmX_m^\top\rmX_m)^{-1/2} \rmX_m^\top\bepsilon_m \stackrel{d}{\longrightarrow} \normal(0, \sigma^2)
$$
Since we assume $\bu$ is a unit-length vector, then if $\rvy\sim \normal(\bzero, \sigma^2\bI)$, we have $\bu^\top\rvy\sim \normal(0, \sigma^2)$. By Cram\'er-Wold device (see Theorem~\ref{theorem:cramer-rao-device}), this implies that 
$$
(\rmX_m^\top\rmX_m)^{-1/2} \rmX_m^\top\bepsilon_m \stackrel{d}{\longrightarrow} \normal(\bzero, \sigma^2\bI).
$$
We complete the proof.
\end{proof}

The result above implies that, for large enough sample size $n$, the OLS estimator approximately follows a multivariate normal distribution:
$$
\widehatbbeta_m \stackrel{d}{\longrightarrow} \normal(\bbeta, \sigma^2(\bX_m^\top\bX_m)^{-1}).
$$
Similarly, the predicted values satisfy:
$$
\widehat{\rvy}_m = \bX_m\widehatbbeta_m \stackrel{d}{\longrightarrow} \normal(\bX_m\bbeta, \sigma^2\bX_m(\bX_m^\top\bX_m)^{-1}\bX_m^\top) = \normal(\bX_m\bbeta, \sigma^2\bH_m).
$$
This asymptotic normality justifies the use of standard inference procedures (e.g., hypothesis testing, confidence intervals) even when the errors are not normally distributed, provided the regularity conditions hold.

\index{Asymptotic normality}
\subsection*{Asymptotic Normality under Higher Moment Assumption}
To further investigate the approximate sampling distribution of the OLS estimator under the moment assumption, we introduce the following additional condition:
\begin{description}
\centering
\item Assumption OLS.6: $\quad \Exp[\epsilon^4] < \infty$ and $\Exp[\norm{\rvx}^4] <\infty$.
\end{description}

\begin{theoremHigh}[Large sample distribution of $\widehatbbeta_m$]\label{lemma:larg-sample-hat-beta2}
Let $\{\rmX_m\}$ be a sequence of $n\times p$ design matrices, and $\{\bepsilon_m\}$ be a sequence of $n\times 1$ vectors. For each element, let $\rvy_m = \rmX_m\bbeta + \bepsilon_m$. 
Suppose the model satisfies the following assumptions:
\begin{enumerate}[(i)]
\item  OLS.0 (random sampling): ($\ry_i, \rvx_i$), $i=1,2,\ldots, n$ are i.i.d.
\item  OLS.1.2 (full rank): $\rank(\Exp[\rvx\rvx^\top] = p)$.
\item  OLS.2.2: $\ry = \rvx^\top \bbeta+\epsilon$ with $\Exp[\epsilon\rvx] = \bzero$.
\item  OLS.3 (second moment): $\Exp[\epsilon_i^2] < \infty$, where $\epsilon_i$ is the noise associated with input $\rvx_i$.
\item  OLS.6: $\Exp[\epsilon^4] < \infty$ and $\Exp[\norm{\rvx}^4] <\infty$.
\end{enumerate}
\item Then, the ordinary least squares estimator $\widehatbbeta_m = (\rmX_m^\top\rmX_m)^{-1}\rmX_m^\top\rvy_m$ satisfies 
$$
\sqrt{n} (\widehatbbeta_m-\bbeta) \Stackd \normal(\bzero, \bV),
$$
where $\bV = \bA^{-1} \bOmega \bA^{-1}$ with $\bA = \Exp[\rvx\rvx^\top] $ and $\bOmega = \Exp[\epsilon^2 \rvx\rvx^\top ]$.
\end{theoremHigh}
\begin{proof}[of Theorem~\ref{lemma:larg-sample-hat-beta2}]
From Theorem~\ref{theorem:consistency-lse}, we know that:
$$
\sqrt{n} (\widehatbbeta_m-\bbeta) = \sqrt{n}(\rmX_m^\top\rmX_m)^{-1} \rmX_m^\top \bepsilon_m = \left( \frac{1}{n} \sum_{i=1}^{n} \rvx_i \rvx_i^\top \right)^{-1} \left( \frac{1}{\sqrt{n}}  \sum_{i=1}^{n} \epsilon_i\rvx_i \right).
$$	
By applying Cauchy-Schwarz inequality, we have 
$$
\Exp\left[\norm{\rvx_i\rvx_i^\top \epsilon _i^2} \right] \leq \Exp\left[\norm{\rvx_i\rvx_i^\top }^2\right]^{1/2} \Exp\left[\epsilon_i^4\right]^{1/2} \leq \Exp\left[\norm{\rvx_i}^4\right]^{1/2} \Exp\left[\epsilon_i^4\right]^{1/2} < \infty,
$$
where the second inequality follows from the Schwarz matrix inequality (see Section~\ref{section:inequalities}), and the last inequality holds due to Assumption OLS.6.
Therefore, according to the CLT and OLS.2.2, we have
$$
\frac{1}{\sqrt{n}}  \sum_{i=1}^{n} \epsilon_i\rvx_i  = \sqrt{n} \left( \frac{1}{n}\sum_{i=1}^{n} \epsilon_i\rvx_i - \bzero \right)  \Stackd \normal(\bzero, \bOmega).
$$
Again, we have $ \frac{1}{n} \sum_{i=1}^{n} \rvx_i \rvx_i^\top \Stackp \bA$. Thus,
$$
\sqrt{n} (\widehatbbeta_m-\bbeta)  \Stackd \bA^{-1} \normal(\bzero, \bOmega) = \normal(\bzero, \bV)
$$
by Slutsky's theorem and the symmetry of $\bA$.
\end{proof}

Compared to Theorem~\ref{theorem:larg-sample-hat-beta} (which assumes \textit{homoskedasticity}), we have a similar result in the  Theorem~\ref{lemma:larg-sample-hat-beta2} above (which assumes \textit{heteroskedasticity}). However, in Theorem~\ref{lemma:larg-sample-hat-beta2}, we do not assume the moment assumption on the noise: $\Exp[\bepsilon_m] = \bzero$ and $\Cov[\bepsilon_m]=\sigma^2\bI_n$. This results in the difference between the covariance matrices of $\widehatbbeta_m$ in the two theorems.
\index{Homoskedastic}
\index{Heteroskedastic}

In the homoskedastic model, $\bV$ reduces to $\bV_0 = \sigma^2 \bA^{-1}$, which is the same as that in Theorem~\ref{theorem:larg-sample-hat-beta}. \footnote{$\bV_0 = \sigma^2 \bA^{-1}$ is known as the \textit{homoskedastic covariance matrix}.} But the result in Theorem~\ref{lemma:larg-sample-hat-beta2} is more general, as it allows for arbitrary patterns of heteroskedasticity, provided the fourth-moment conditions hold.

\begin{problemset}

\item \textbf{Gaussian MLE.} Let $x_1, x_2, \ldots, x_n$ be i.i.d. samples drawn from a Gaussian density $\normal(\mu, \sigma^2)$ (Definition~\ref{definition:gaussian_distribution}).
The unique MLE of $[\mu, \sigma^2]$ is given by 
$$
[\widehat{\mu}, \widehat{\sigma}^2] = \left[\bar{x},  \frac{1}{n}(x_i - \bar{x})^2\right],
$$
where $\bar{x} \triangleq \frac{1}{n}\sum_{i=1}^{n}x_i$. Show that the negative Hessian is positive definite.

\item \label{problem:exponential_mle} \textbf{Exponential MLE.} Let $x_1, x_2, \ldots, x_n$ be i.i.d. samples drawn from an exponential density $\exponential(\lambda)$ (Definition~\ref{definition:exponential_distribution}).
Show that the unique MLE of $\lambda$ is given by 
$$
\widehat{\lambda} = \left(\frac{1}{n}\sum_{i=1}^{n} x_i\right)^{-1} = \frac{1}{\bar{x}}.
$$

\item \textbf{Uniform noise linear model.} Suppose noise terms are i.i.d. with $\epsilon_i\sim \text{Uniform}(-a,a)$ for $i\in\{1,2,\ldots,n\}$, i.e., uniformly distributed on the interval $[-a,a]$ and the density function is $p(x)=\frac{1}{2a}$ for $x\in[-a,a]$. Show that a maximum likelihood estimate of the linear model is any $\bbeta$ satisfying $\norm{\bX\bbeta-\by}_\infty \leq a$.

\item Complete the proof of Theorem~\ref{theorem:mle-gaussian} by verifying that all second partial derivatives of the log-likelihood function are negative, confirming that the critical point corresponds to a maximum.


\item Show that $\rx\sim \chi_{(2)}^2$ if and only if $\rx\sim \exponential(1/2)$ (Definition~\ref{definition:exponential_distribution}).

\item Prove that if $\rx\sim t_{(n)}$, then $\rx^2\sim F_{1,n}$.

\item In Theorem~\ref{theorem:mle-gaussian}, we derived the maximum likelihood estimator of $\sigma^2$ as: $\widehat{\sigma}^2=\frac{1}{n}(\rvy-\bX\widehatbbeta)^\top(\rvy-\bX\widehatbbeta)=\frac{1}{n}\rve^\top\rve = \frac{1}{n}\sum_i^n \re_i^2$. By Theorem~\ref{theorem:ss-chisquare}, we show that
the maximum likelihood estimator of $\sigma^2$ is a biased estimator. The unbiased estimator of $\sigma^2$ should be 
$\rS^2=\frac{1}{n-p} (\rvy-\bX\widehatbbeta)^\top(\rvy-\bX\widehatbbeta) = \frac{1}{n-p}\rve^\top\rve$.
Using the definitions of overestimation and underestimation provided in Definition~\ref{defintion:biased_unbiased}, determine whether the MLE $\widehat{\sigma}^2$ overestimates or underestimates the true variance $\sigma^2$.


\item Prove Remark~\ref{remark:asump_gau_chi} rigorously.

\item Suppose $\rvx_n \Stackd \normal(\bzero, \bSigma)$ and $\rmY_n \Stackp \bSigma$, where $\rvx_n\in\real^k$ and $\rmY_n\in\real^{k\times k}$. Show that 
$\rvx_n^\top\rmY_n^{-1}\rvx_n \Stackd \chi_{(k)}^2$.

\item 
Using the classical central limit theorem (CLT, Theorem~\ref{theorem:clt_main}), prove the validity of the weighted sum CLT (Theorem~\ref{theorem:weighted-sum-clt}), which states that under suitable conditions, a weighted sum of independent random variables converges in distribution to a normal distribution.

\item 
Let $\rx_1, \rx_2, \ldots, \rx_n$ be i.i.d. random variables defined on the interval $[0, 1]$,  with probability density function:
$$
p(x\mid \theta) = \frac{\Gamma(2\theta)}{\Gamma(\theta)^2} [x(1-x)]^{\theta-1},
$$
where $\theta > 0$ is a parameter to be estimated from the sample. It can be shown that
$$
\Exp[\rx] = \frac{1}{2}
\qquad\text{and}\qquad 
\Var[\rx] = \frac{1}{4(2\theta + 1)}.
$$
Find the asymptotic variance of the MLE of $\theta$ based on this sample.

\end{problemset}
\newpage
\chapter{ Model Evaluation, Selection, and Analysis}\label{chapter:model_eva_sel}
\begingroup
\hypersetup{
linkcolor=structurecolor,
linktoc=page,  
}
\minitoc \newpage
\endgroup

\section{Linear Model Evaluation and Hypothesis Test}
\lettrine{\color{caligraphcolor}A}
After we have estimated, trained or fitted a model, it is important to assess how well the model performs. To evaluate its performance, we need to select an appropriate metric. There are many ways to measure how well a model fits the data; however, no single metric is universally suitable for all scenarios, datasets, or models. In practice, the choice of evaluation metric should be guided by the specific context, data, and model being used. Although numerous metrics exist, they can generally be grouped based on their evaluation objectives.

\subsection{Goodness of Fit}\label{section:goodness_fit}

The \textit{goodness of fit (GOF)} is a statistical measure that quantifies the agreement between observed data and the predictions generated by a model. Specifically, it evaluates the discrepancy between the vector of observed values $\by=[y_1, y_2,\ldots,y_n]^\top$ and the corresponding fitted or predicted values $\widehatby=[\widehaty_1, \widehaty_2, \ldots, \widehaty_n]^\top$.

A variety of GOF metrics exist to assess this agreement, including commonly used ones such as the mean squared error (MSE), likelihood-based measures, and others. In many modeling contexts, parameter estimation involves optimizing (often maximizing or minimizing) a chosen goodness-of-fit criterion to achieve the best possible alignment between the model and the data.

Fitting a model to data entails using a structured mathematical representation to approximate complex real-world observations. While this process simplifies the data into a compact set of parameters, the predicted values rarely match the observed values exactly. A central question in model evaluation is therefore: how significant is the deviation between the model's predictions and the actual data? A smaller discrepancy indicates a better fit, whereas a larger deviation suggests a poorer fit. The concept of goodness of fit formalizes this evaluation, providing quantitative tools to assess how well a model captures the underlying patterns in the observed data.

\subsection*{Nested Models}
Two statistical models are said to be \textit{nested} if one model can be derived from the other by imposing constraints on its parameters.
More precisely, suppose we fit two  linear models, Model-I and Model-II, using the same dataset. If restricting some parameters of Model-II---typically by setting them to zero---results in Model-I, then we say that \textit{Model-I is nested within Model-II}.

\begin{example}[Nested model]
For example, we consider the linear predictor  equation for Model-I as:
$$
\eta = \beta_0 + \beta_1 x_1 + \beta_2 x_2 + \beta_3 x_3.
$$
For the Model-II, we assume that the linear prediction takes the form:
$$
\eta = \beta_0 + \beta_1 x_1.
$$
In this case, Model-II is a nested version of Model-I because it can be obtained by constraining $\beta_2 = \beta_3 = 0$ in Model-I.
\end{example}

Generally, the more parameters a model has, the better it can fit the observed data. In the extreme case, when the number of parameters equals the number of observations, the model can perfectly fit all data points. Such a model is called a \textit{saturated model} (also known as a \textit{full model or maximal model}). While a saturated model achieves perfect fit, it does not generalize well to new data---it simply memorizes the training data, leading to \textit{overfitting}.

By imposing constraints on the parameters of a saturated model---for instance, setting some of them to zero---we reduce the model's complexity. This results in a simpler model. Although such a model may fit the data less closely, it often generalizes better to new data. However, reducing the number of parameters too much can significantly harm the model's fit, so fewer parameters do not always lead to better performance.

A saturated model assigns a separate parameter to each observation, meaning that with $n$ samples, there are $n$ parameters. This allows the model to perfectly reproduce the observed responses. On the opposite end of the spectrum, we define the \textit{null model}, sometimes referred to as the \textit{``worst" model}, which contains only an intercept term or a bias term ($\beta_0$) and no predictor variables. The null model has the least flexibility and typically provides the poorest fit among non-saturated models.

\subsection*{Likelihood Ratio}
In the Gauss-Markov linear model, the likelihood essentially represents the joint probability of observing the sample data given a model. The larger the likelihood value, the better the model fits the sample. Consequently, we can compare the goodness of fit between two models by comparing their likelihood values. Let's denote the simpler model with fewer parameters as Model $s$, with its likelihood denoted as $\mathcalL_s$, and another more complex model with more parameters as Model $m$, with its likelihood denoted as $\mathcalL_m$.

In statistics, to determine which of two nested models fits the data better, we use the \textit{likelihood ratio (LR)}. The LR compares how well two nested models fit the same dataset. The formula for the likelihood ratio statistic is:
\begin{equation}
\LR = -2 \left( \frac{\mathcalL_s}{\mathcalL_m} \right).
\end{equation}
where $\mathcalL_m$ is the likelihood of the complex model, and $\mathcalL_s$ is the likelihood of the simple model. From the formula, we can see that the likelihood ratio is the ratio of the likelihoods of the two models. Typically, we do not directly use the ratio of the likelihoods but work with the log-likelihood ratio:
\begin{equation}\label{equation:llr_def}
\LLR = -2 \ln \left( \frac{\mathcalL_s}{\mathcalL_m} \right)  = 2 (\ln \mathcalL_m - \ln \mathcalL_s).
\end{equation}
After taking the logarithm, the calculation becomes the difference in the log-likelihoods of the two models, making it more convenient and computationally efficient.

However, not any two models can be compared using the likelihood ratio; certain conditions must be met:
\begin{enumerate}[(i)]
\item Both models must use the same dataset, with the same number of samples. Different datasets yield different likelihood values, making comparisons meaningless.
\item The models must be nested.
\end{enumerate}

For two nested models, the primary distinction lies in the number of parameters. In linear models, this means the dimension of the parameter vector $\bbeta$ differs. The parameter vector $\bbeta_s$ of the simpler model is a subset of the parameter vector $\bbeta_m$ of the more complex model. Setting some elements of $\bbeta_m$ to zero yields $\bbeta_s$. Given the same fitting effect, a model with fewer parameters is generally preferred because it is simpler. However, theoretically, a model with more parameters will always fit the data at least as well as a simpler model. Thus, the log-likelihood of a complex model is always greater than or equal to that of a simpler model, ensuring LLR$\geq 0$. When LLR=0, it indicates that both models fit the data equally well, but in practice, this scenario is highly unlikely, and the log-likelihood ratio is usually positive.

A natural question arises: what range of LR values suggests that the fitting degrees of the two models are close? To address this, we need a method and standard for judgment. The likelihood-ratio statistic asymptotically follows a Chi-squared distribution, with degrees of freedom equal to the difference in the number of parameters between the two nested models. Since the LLR value is a random variable, directly using it to judge model fit is unreliable. Instead, hypothesis testing methods can be employed to assess the significance of the likelihood-ratio statistic. This test is known as the \textit{likelihood-ratio test (LRT)} or \textit{likelihood-ratio Chi-squared test (LRCT)}; see Section~\ref{section:mklm_hypotest}. In statistics, the LRT is a widely used approach for comparing the goodness of fit of two nested models based on maximum likelihood estimation.

\subsection*{Deviance}
The likelihood ratio test is a widely used method for comparing nested models. This test evaluates two models against each other rather than assessing a single model in isolation. Additionally, we introduce a derived statistic from the likelihood ratio statistic: the \textit{deviance statistic}. The deviance statistic is essentially a form of the likelihood ratio statistic but can be applied to measure the goodness of fit for a single model.

When developing a model, our goal is to ensure that the predicted or fitted values $\widehatby$ closely approximate the true data values $\by$. As mentioned previously, for an observational sample of size $n$, models can vary in complexity from having just one parameter (the null model) up to $n$ parameters (the saturated model). The simplest model with only one parameter (the null model) makes the same prediction for all samples, lacking any fitting ability. In contrast, the most complex model with $n$ parameters (the saturated model) can perfectly fit all samples but fails to generalize to new data. Although the saturated model cannot be directly used due to its lack of generalization, it serves as a benchmark for evaluating the fitting ability of other models.

We refer to the trained or fitted model as Model $\mathcalL_t$, with its likelihood denoted by $\mathcalL_t$. Similarly, let $\mathcalL_f$ represent the likelihood of the corresponding saturated model. The log-likelihood ratio statistic between these two models, known as the \textit{deviance} or \textit{deviance statistic}, is given by:
\begin{equation}\label{equation:def_deviance}
\Deviance = 2 (\ln \mathcalL_f - \ln \mathcalL_t),
\end{equation}
which has the same form as the log-likelihood ratio defined in  \eqref{equation:llr_def}, but uses different likelihood definitions.

\paragrapharrow{Deviance and MLE.}
The deviance statistic represents twice the difference between the log-likelihood of the saturated model and the fitted model. Since the dataset is observed, the log-likelihood of the saturated model is a constant. Therefore, during parameter estimation, minimizing the deviance is equivalent to maximizing the log-likelihood of the fitted model. Consequently, the maximum likelihood estimate (MLE) is also the minimum deviance estimate.

\paragrapharrow{Deviance and squared error.}
For the Gauss-Markov linear regression model in \eqref{equation:likelihood-of-gaussiannoise},  we have $\widehaty_i = \widehatbbeta^\top\bx_i$ for each observation $i$. Therefore, the deviance is:
\begin{equation}\label{equation:dev_squa_error}
\begin{aligned}
\Deviance = 
\sum_{i=1}^{n}\frac{1}{2\sigma^2} (y_i - \widehatbbeta^\top\bx_i)^2 - \frac{1}{2\sigma^2}(y_i - y_i)^2 = \sum_{i=1}^{n}\frac{1}{2\sigma^2} (y_i - \widehaty_i)^2.
\end{aligned}
\end{equation}
It can be seen that for the Gauss-Markov linear model, the deviance is consistent with the sum of squared errors. Indeed, the deviance can be viewed as an extension of the least squares method (with squared loss).

\index{Residual sum of squares}
\index{Total sum of squares}
\subsection{Coefficient of Determination $R^2$}\label{section:coeff_det_r2}

In classical linear regression models (also called OLS, Gauss-Markov linear models), a commonly used measure of goodness of fit is the \textit{$R^2$ statistic} (or $R^2$ measure, coefficient of determination $R^2$). 
The original definition of $R^2$ was introduced in the context of OLS models and does not directly apply to non-OLS models. 
Over time, many scholars have proposed various extensions of $R^2$ for use in other types of models, such as logistic regression for binary classification. 
In this section, we first provide the definition of $R^2$ in the context of OLS. 
A generalized version for generalized linear models (GLMs) will be discussed in Section~\ref{section:dev_r2}.

\subsection*{$R^2$ Measure}
The $R^2$, also known as the coefficient of determination, is a measure used to assess how well an OLS model fits the observed data.

\begin{definition}[$R^2$ measure]\label{definition:r2_mea}
Let $\by=[y_i]$ denote the observed values of $n$ samples, $\widehatby = [\widehat{y}_i]$ denote the predicted values of the model (i.e.,  $\widehat{\by} = \bX\widehat{\bbeta}=\bX (\bX^\top\bX)^{-1}\bX^\top\by$ is the projection of $\by$ onto the column space of $\bX$). 
Let $\overline{y} = \frac{1}{n}\sum_{i=1}^{n} y_i$ denote the sample mean of the observed responses. Then, the $R^2$ measure is defined as 
\begin{equation}
R^2 \triangleq  1 - \frac{\sum_{i=1}^{n}(y_i - \widehat{y}_i)^2}{\sum_{i=1}^{n}(y_i - \overline{y})^2}
=
1 - \frac{\normtwo{\by-\widehatby}^2}{ \normtwo{\by - \overline{y}\bone_n}^2},
\end{equation}
where $\bone_n$ denotes an all one's vector in  $\real^n$.
The term $\normtwo{\by-\widehatby}^2 =\normtwo{\be}^2$ is known as the \textit{residual sum of squares (RSS)}, and the term $\normtwo{\by - \overline{y}\bone_n}^2$ is called the \textit{total sum of squares (TSS)}:
\begin{equation}
	R^2 = 1 - \frac{\RSS}{\TSS}.
\end{equation}
The higher the value of $R^2$, the better the model fits the data.
\end{definition}

It's important to note that the RSS quantifies the model's fitting error---it is both the loss function and the objective function being minimized in the OLS framework (its sampling distribution under Gaussian disturbance is discussed in Section~\ref{sec:dist_sse}). 
A larger RSS indicates a worse fit, so the smaller the RSS, the better the model fits the data. The RSS can take any value in the interval [0, $\infty$]. While theoretically its minimum value is 0 (indicating a perfect fit), achieving this in practice is extremely rare; typically, a positive value is obtained.
Simply looking at an RSS value lacks a reference standard, making it difficult to directly judge whether the model's fitting ability has reached its limit or if there is still room for optimization. 

The denominator in the $R^2$ formula, $\sum_{i=1}^{n}(y_i - \overline{y})^2$, is known as the total sum of squares (TSS). It serves as a reference point for the RSS, representing the maximum possible value of RSS when using only an intercept (a constant prediction for all observations).

%

To understand this better, consider the design matrix $\bX$ that contains only a column of all ones: $\bX_1 = \bone_n\in \real^n$ (i.e., the model  has only an intercept parameter, as mentioned above, such a model is called a null model). 
Then the least squares prediction becomes $[\overline{y},\overline{y}, \ldots, \overline{y}]^\top = \overline{y} \bone_n= \bX[\overline{y}, 0, \ldots, 0]^\top \in \cspace(\bX)$.
To be more concrete, if we only select the first column, we should project $\by$ onto the column space of $\bX_1$, and the hat matrix becomes $\bH_1 = \bX_1 (\bX_1^\top\bX_1)^{-1}\bX_1^\top = \frac{\bone_n \bone_n^\top}{n}$. 
Then,  the projection of $\by$ is $\overline{y} \bone_n = \bH_1\by$.
Therefore, the denominator part (the TSS) can be seen as the residual sum of squares of the null model.
Since the null model is the simplest possible model (predicting only the mean), its RSS serves as an upper bound for the RSS of any more complex model.

Therefore, in OLS regression, TSS acts a theoretical upper limit for RSS. 
As a result, the ratio $\frac{\RSS}{\TSS}$ lies within $[0, 1]$.  A value closer to 1 indicates a larger residual sum of squares, which implies a poorer model fit.  
Therefore, the model fit is inversely related to $\frac{\RSS}{\TSS}$ and directly related to the coefficient of determination, $R^2$.

To understand why comparing the use of $\bX$ versus $\bX_1$ in the model is significant, consider that when data can be accurately represented by a model, there should be a noticeable difference between a model with only the intercept term ($\bX_1$) and one incorporating all $p$ parameters ($\bX$). The coefficient of determination, $R^2$, ranges from 0 to 1.
When $R^2$ is close to 0, it indicates that the model  does not explain the variability in the data.
Conversely, when $R^2$ approaches 1, $\widehat{\by}$ closely matches $\by$, suggesting that the model effectively explains all variability, making it reasonable to use a linear model to describe the data. 
However, $\widehat{\by}$ represents the best possible fit under the given dataset for linear models. If $\widehat{\by}$ is far from $\by$, the data cannot be adequately represented by a linear model. In such cases, further variable selection procedures would not improve the model fit, and we will discuss these methods later.

\index{ANOVA decomposition}
\subsection*{ANOVA Decomposition}
By Pythagoras' theorem (Remark~\ref{remark:pythagoras}), we have the following property and definitions for the quantities in the $R^2$ measure:
\begin{subequations}\label{equation:anova-toy}
\begin{align}
\normtwo{\by - \overline{y} \bone_n}^2 &= \normtwo{\by - \widehat{\by}}^2 +\normtwo{\widehat{\by} - \overline{y}\bone_n}^2; \\
\text{Total sum of squares} &= \text{Residual sum of squares} + \text{Explained sum of squares}; \\
\TSS &= \RSS + \ESS.
\end{align}
\end{subequations}
This is an example of an \textit{ANOVA (short for analysis of variance) decomposition}. 
ANOVA decompositions partition variance (or sum of squares) into two or more components, which often exhibit orthogonality or adhere to the Pythagorean theorem.
To understand why this equality holds, let $\bH \triangleq \bX (\bX^\top\bX)^{-1}\bX^\top$ represent the hat matrix of the full design matrix $\bX$, and let $\bH_1 \triangleq \bX_1 (\bX_1^\top\bX_1)^{-1}\bX_1^\top$ be the hat matrix of the null model. 
We observe that:
\begin{subequations}
\begin{align}
\TSS &=\by^\top(\bI-\bH_1)^\top(\bI-\bH_1)\by;\\
\RSS &=\by^\top(\bI-\bH)^\top(\bI-\bH)\by;\\
\ESS &=\by^\top(\bH-\bH_1)^\top(\bH-\bH_1)\by.
\end{align}
\end{subequations}
It can be easily verified that $(\bI-\bH_1)$, $(\bI-\bH)$, and $(\bH-\bH_1)$ are symmetric and idempotent. 
Consequently, we have:
\begin{align*}
\by^\top(\bI-\bH_1)\by &= \by^\top(\bI-\bH)\by + \by^\top(\bH-\bH_1)\by
\qquad\implies\qquad 
\TSS = \RSS + \ESS.
\end{align*}

\subsection*{Variance Explanation}
Another interpretation of $R^2$ involves understanding it in terms of variance. 
As mentioned above, TSS represents the total sum of squares for the observed sample, while $\TSS-\RSS$ is known as the \textit{explained sum of squares (ESS)}. This ESS quantifies the amount of variance that the model can explain or fit:
\begin{equation}
	R^2 = 1 - \frac{\RSS}{\TSS} = \frac{\TSS - \RSS}{\TSS} = \frac{\ESS}{\TSS}.
\end{equation}
Therefore,  $R^2$ indicates the proportion of the total variance (TSS) of the observed variable explained by the model after adding predictive (feature) variables. 
For instance, if we calculate an $R^2 $ value of $0.75$, this means that the predictive (feature) variable $\bX$ explains 75\% of the variance in the observed response $\by$, leaving 25\%  unexplained by the current data $\bX$ or model. 
The unexplained portion corresponds to the RSS term.

Let $\mathcalV(\by)$ denote the variance of the observed sample, $\mathcalV(\widehatby)$ denote the variance of the fitted model, and $\mathcalV(\widehat{\bepsilon}) \triangleq \mathcalV(\by) - \mathcalV(\widehatby)$ represent the unexplained variance. $R^2$ can also be defined as follows:
\begin{equation}
R^2 = \frac{\mathcalV(\widehatby)}{\mathcalV(\by)} = \frac{\mathcalV(\widehatby)}{\mathcalV(\widehatby) + \mathcalV(\widehat{\bepsilon})}.
\end{equation}

\index{Degree of freedom}
\subsection*{Degree of Freedom}
In Remark~\ref{remark:df-of-error-vector}, we claimed that the residual vector $\be \triangleq \by - \widehat{\by}$ has $(n-p)$  degrees of freedom, and therefore, $\frac{\RSS}{n-p}$ adjusts for this by dividing the RSS by the appropriate number of degrees of freedom.

Let us define $\ba \triangleq \by - \overline{y} \bone_n$. We observe that the degree of freedom can be explained as $\ba^\top\bone = \sum_{i=1}^{n}a_i = 0$ such that the first $(n-1)$ elements can move freely in $\real^{(n-1)}$,  while the last element is determined as a linear combination of the others.
We formalize this observation in the following remark:
\begin{remark}[Degree of freedom of $\ba$]\label{remark:df_of_a}
The quantity $\frac{\TSS}{n-1} = \frac{\normtwo{\ba}^2}{n-1}$ adjusts for the degrees of freedom of $\ba$, while the degree of freedom of $\ba$ is $(n-1)$ if $\bX \in \real^{n\times p}$ has full column rank with $n\geq p$.

Similarly, let $\bb \triangleq \widehat{\by} - \overline{y}\bone_n$. 
Since $\bb = \ba- \be$,  it follows that the degrees of freedom associated with $\bb$ is $(p-1)$. 
And $\frac{\ESS}{p-1} = \frac{\normtwo{\bb}^2}{p-1}$ adjusts for the degrees of freedom of $\bb$. 
We summarize the the degrees of freedom for the three terms in Table~\ref{table:three-variation-df}.
\end{remark}


\begin{table}[h!]
\centering
\setlength{\tabcolsep}{11.5pt}
\begin{tabular}{c|c|c}
\hline
& Notation                  & Degrees of freedom \\ \hline\hline
$\TSS$ & $\ba^\top \ba = (\by - \overline{y} \bone_n)^\top(\by - \overline{y} \bone_n)$        & $n-1$             \\ \hline
$\RSS$ & $\be^\top \be=(\by-\widehat{\by})^\top(\by - \widehat{\by})$                          & $n-p$             \\ \hline
$\ESS$ & $\bb^\top\bb=(\widehat{\by} - \overline{y}\bone_n)^\top(\widehat{\by} - \overline{y}\bone_n)$  & $p-1$             \\ \hline
\end{tabular}
\caption{Degrees of freedom in OLS.}\label{table:three-variation-df}
\end{table}
\index{Degree of freedom in OLS}
\index{ANOVA}

Furthermore, recall the decomposition:
$$
\begin{aligned}
\TSS &= \RSS + \ESS;\\
\by^\top(\bI-\bH_1)\by &= \by^\top(\bI-\bH)\by + \by^\top(\bH-\bH_1)\by,
\end{aligned}
$$
where $\TSS, \RSS$, and $\ESS$ are all quadratic forms of $\by$ with different \textit{defining matrices}: $\bI-\bH_1, \bI-\bH$, and $\bH-\bH_1$, respectively. 
The following facts about quadratic forms are important (see \citet{rawlings2001applied, gut2009multivariate} for further discussion):
\begin{enumerate}[(i)]
\item  Any sum of squares can be expressed in the form $\by^\top \bA\by$, where $\bA$ is a  symmetric positive semidefinite  matrix.
\item  The degrees of freedom associated with any quadratic form is equal to the rank of the defining matrix $\bA$, which equals its trace when the matrix is idempotent. This aligns with our earlier discussion since: $\rank(\bI-\bH_1)=n-1$, $\rank(\bI-\bH)=n-p$, and $\rank(\bH-\bH_1)=p-1$.
\item  Two quadratic forms are orthogonal if the product of their defining matrices is the zero matrix $\bzero$.
\end{enumerate}

\index{Analysis of variance}
\index{ANOVA}
\index{ANOVA decomposition}
\index{Model selection}
\index{$R^2$ test}
\index{$R^2$ estimator}
\index{Adjusted $R^2$ estimator}
\subsection*{$R^2$ Estimator and Adjusted $R^2$ Estimator}
Previously, we define the $R^2$ measure as: 
$$
R^2 = \frac{\ESS}{\TSS} = 1-\frac{\RSS}{\TSS},
$$
which quantifies the proportion of variance in the response variable explained by the model. This definition is equivalent to:
\begin{equation}
R^2 = \frac{\ESS}{\TSS} = 1-\frac{\RSS/n}{\TSS/n} =1-\frac{\normtwo{\by - \widehat{\by}}^2/n}{\normtwo{\by - \overline{y} \bone_n}^2/n} = 1 - \frac{\widehat{\sigma}^2}{\widehat{\sigma}_y^2},
\end{equation}
where $\widehat{\sigma}^2$ is the maximum likelihood estimate of the noise variance $\sigma^2$, as shown in  Theorem~\ref{theorem:mle-gaussian}, and $\widehat{\sigma}_y^2$ is an estimate of $\{y_1, y_2, \ldots, y_n\}$, with $y_i$ denoting the $i$-th element of $\by$. Based on this, we can define the \textit{population quantity}:
$$
\rho^2 \triangleq 1 -\frac{\sigma^2}{\sigma^2_y},
$$
where $\sigma^2$ represents the noise variance in the linear model, and $\sigma^2_y$ is the variance of output variables $\rvy$. From the discussion on the  degree of freedom, we realize that both the two estimators are biased estimators.  
Thus, $R^2$ itself  is a ``{biased}'' estimator of $\rho^2$.

The corresponding unbiased estimates of $\sigma^2$ and $\sigma^2_y$ are 
\begin{equation}\label{equation:unbiased_sigmay}
\begin{aligned}
&\normtwo{\by - \widehat{\by}}^2/(n-p),  &\gap\gap \text{(see Lemma~\ref{theorem:ss-chisquare}) }\\ 
&\normtwo{\by - \overline{y} \bone_n}^2/(n-1), &\gap\gap \text{(see Table~\ref{table:three-variation-df} or discussion below)}
\end{aligned}
\end{equation}
respectively.
Using these unbiased variance estimates, we define the following adjusted version of $R^2$:
$$
\overline{R}^2 = 1-\frac{\normtwo{\by - \widehat{\by}}^2/(n-p)}{\normtwo{\by - \overline{y} \bone_n}^2/(n-1)} = 1- (1-R^2) \frac{n-1}{n-p}.
$$
This is known as the \textit{adjusted $R^2$} estimator of $\rho^2$ \citep{theil1961economic}.
Unlike $R^2$, the adjusted $R^2$ accounts for the number of predictors in the model and penalizes the inclusion of irrelevant or uninformative features. As a result, adding meaningless features may actually cause $\overline{R}^2$ to decrease, making it a more reliable metric for evaluating whether new features contribute meaningfully to the model.

\index{Unbiased estimator}
\index{Gaussian sampling}
\paragrapharrow{Unbiased estimator of $\sigma^2_y$ from Gaussian sampling.}
Suppose $\rvy =[\ry_1, \ry_2, \ldots, \ry_n]^\top$ and $\ry_1, \ry_2, \ldots, \ry_n$ are random variables from Gaussian distribution, i.e., $\ry_1, \ry_2, \ldots, \ry_n \sim \normal(\mu_y, \sigma^2_y)$. Let
$$
\overline{\ry} \triangleq \frac{1}{n}\sum_{i=1}^{n} \ry_i 
\qquad\text{and}\qquad
\rS_{\ry}^2 \triangleq \frac{1}{n-1}\normtwo{\rvy - \overline{\ry} \bone_n}^2.
$$
Then, we have the following result.
\begin{lemma}[Gaussian sampling]\label{lemma:gaussian-sampling}
\item 1). The sample mean satisfies $\overline{\ry} \sim \normal(\mu_y, \sigma_y^2/n)$; 
\item 2). The random variable $\rS_{\ry}^2$ satisfies $\frac{n-1}{\sigma_y^2} \rS_{\ry}^2 \sim  \chi_{(n-1)}^2$;
\item 3). The random variables $\overline{\ry}$ and $\rS_{\ry}^2$ are independent.
\end{lemma}
The proof can be found in \citet{panaretos2016statistics}. From this lemma, we have $\Exp[\rS_{\ry}^2] = \sigma_y^2$ (by Definition~\ref{definition:chisquare_dist}) such that  $\rS_{\ry}^2$ is an unbiased estimator of $\sigma_y^2$, as claimed in \eqref{equation:unbiased_sigmay}.

\index{Unbiased estimator}

\index{ANOVA decomposition}
\subsection*{ANOVA Decomposition}
We now present distributional results related to the ANOVA decomposition.

\begin{theoremHigh}[Distribution results for ANOVA decomposition]\label{theorem:anova-r2-test}
Let $\rvy = \bX\bbeta + \epsilon$, where $\epsilon \sim \normal(\bzero, \sigma^2 \bI)$. 
And assume $\bX\in \real^{n\times p}$ is fixed and has full rank with $n\geq p$ (i.e., rank is $p$ so that $\bX^\top \bX$ is invertible). 
Then, under the null hypothesis $\mathcalH_0: \bbeta = [\beta_0, 0, 0, \ldots, 0]^\top$, we have the following results:
\begin{enumerate}[(i)]
\item  $\frac{1}{\sigma^2}\TSS\sim \chi_{(n-1)}^2$;
\item  $\frac{1}{\sigma^2}\RSS\sim \chi^2_{(n-p)}$;
\item  $\frac{1}{\sigma^2}\ESS\sim \chi_{(p-1)}^2$;
\item  $\RSS$ and $\ESS$ are independent.
\end{enumerate}
\end{theoremHigh}
\begin{proof}[of Theorem~\ref{theorem:anova-r2-test}]
\textbf{(i).}
Let $\bH_1= \frac{\bone_n \bone_n^\top}{n}$, then $\rvy - \overline{\ry} \bone_n = (\bI-\bH_1)\rvy$. 
Recall that $\rvy \sim \normal(\bX\bbeta, \sigma^2\bI)$, and it can be easily verified that $(\bI-\bH_1)$ is symmetric and idempotent (i.e., an orthogonal projection).
Then, by affine transformation of $\rvy$, it follows that 
$$
\begin{aligned}
\rvy - \overline{\ry} \bone_n
&\sim \normal\left((\bI-\bH_1)\bX\bbeta, (\bI-\bH_1)(\sigma^2\bI)(\bI-\bH_1)^\top\right) \\
&\stackrel{\dag}{=}\normal\left((\bI-\bH_1)\bX\bbeta, \sigma^2(\bI-\bH_1)^2\right)
\stackrel{\ddag}{=}\normal\left((\bI-\bH_1)\beta_0 \bone_n, \sigma^2(\bI-\bH_1)^2\right) \\
&=\normal\left(\bzero, \sigma^2(\bI-\bH_1)^2\right),
\end{aligned}
$$
where the equality ($\dag$) follows from the symmetry of $\bI-\bH_1$, and the equality ($\ddag$) follows from the hypothesis $\mathcalH_0$. 
The affine transformation of multivariate normal distribution (Lemma~\ref{lemma:affine_mult_gauss}) implies that $\rvy - \overline{\ry} \bone_n$ and $(\bI-\bH_1)\bepsilon$ have the same distribution. That is, $\TSS=\normtwo{\rvy - \overline{\ry} \bone_n}^2$ and $\bepsilon^\top (\bI-\bH_1)\bepsilon$ have the same distribution:
\begin{equation}
\TSS  \stackrel{d}{=} \bepsilon^\top(\bI-\bH_1)\bepsilon. \nonumber
\end{equation}
By Spectral Theorem~\ref{theorem:spectral_theorem} and Lemma~\ref{proposition:eigenvalues-of-projection} (the only possible eigenvalues of orthogonal projection matrices are 0 and 1), we can rewrite the sum of squared errors as $\TSS \stackrel{d}{=} \bepsilon^\top(\bI-\bH_1)\bepsilon = \bepsilon^\top(\bQ \bLambda\bQ^\top)\bepsilon$, where $\bI-\bH_1=\bQ \bLambda\bQ^\top$ is the spectral decomposition of $\bI-\bH_1$. 
By the fact that rotations on the normal distribution do not affect the distribution~(Lemma~\ref{lemma:rotat_multi_gauss}), we have
$$
\boldeta \triangleq \bQ^\top\bepsilon \sim \normal(\bzero, \sigma^2\bI)
\qquad\implies\qquad 
\TSS \stackrel{d}{=}  \boldeta^\top \bLambda \boldeta \sim \sigma^2 \chi^2_{\mathrm{rank(\bI-\bH_1)}}\sim\sigma^2 \chi^2_{(n-1)},
$$
where $\rank(\bI-\bH_1) = \trace(\bI) - \trace(\bH_1)=n-1$ by Lemma~\ref{lemma:rank-of-symmetric-idempotent}.

\paragraph{(ii).}
For $\RSS$, we realize that $\RSS = \normtwo{\rvy - \widehat{\rvy}}^2$ is equivalent to $\rve^\top\rve$ in Theorem~\ref{theorem:ss-chisquare}, and it follows:
$$
\RSS\sim \sigma^2 \chi^2_{(n-p)}.
$$

\paragraph{(iii).}
For $\ESS$, let $\bH = \bX (\bX^\top\bX)^{-1}\bX^\top$. Similarly, it holds that $\widehat{\rvy} - \overline{\ry}\bone_n = (\bH-\bH_1)\rvy$, whence we have 
$$
\begin{aligned}
\widehat{\rvy} - \overline{\ry}\bone_n
&\sim\normal\left((\bH-\bH_1)\bX\bbeta, (\bH-\bH_1)(\sigma^2\bI)(\bH-\bH_1)^\top\right) \\
&=\normal\left((\bH-\bH_1)\bX\bbeta, \sigma^2(\bH-\bH_1)^2\right)
\stackrel{\dag}{=}\normal\left((\bH-\bH_1)\beta_0 \bone_n, \sigma^2(\bH-\bH_1)^2\right) \\
&=\normal\left(\bzero, \sigma^2(\bH-\bH_1)^2\right). 
\end{aligned}
$$
where again the equality $(\dag)$ follows from the hypothesis $\mathcalH_0$, and the last equality follows from the fact that $\bH\bone_n=\bH_1\bone_n = \bone_n$.
Again, the affine transformation of multivariate normal distribution implies $\widehat{\rvy} - \overline{\ry}\bone_n$ and $(\bH-\bH_1)\bepsilon$ have the same distribution. This results in
\begin{equation}
\ESS =\normtwo{\widehat{\rvy} - \overline{\ry}\bone_n}^2  \stackrel{d}{=} \bepsilon^\top(\bH-\bH_1)\bepsilon. \nonumber
\end{equation}
Thus, for the spectral decomposition of $\bH-\bH_1 = \bQ\bLambda\bQ^\top$ and $\boldeta = \bQ\bepsilon$, we have
\begin{equation}
\ESS \stackrel{d}{=}  \boldeta^\top \bLambda \boldeta \sim \sigma^2 \chi^2_{\mathrm{rank(\bH-\bH_1)}}\sim\sigma^2 \chi^2_{(p-1)}, \nonumber
\end{equation}
where 
$
\rank(\bH-\bH_1)= \trace(\bH) - \trace(\bH_1)
=\trace(\bX (\bX^\top\bX)^{-1}\bX^\top)-\trace\left(\frac{\bone_n \bone_n^\top}{n}\right)
=p-1.
$

\paragraph{(iv).}
Finally, we have 
$$
\begin{aligned}
\Cov[(\bI-\bH)\rvy, (\bH-\bH_1)\rvy] 
&= (\bI-\bH)\Cov[\rvy,\rvy] (\bH-\bH_1)^\top
= \sigma^2 (\bI-\bH) (\bH-\bH_1) \\
&= \sigma^2 +\bH - \bH_1 - \bH +\bH\bH_1 
= \bzero,
\end{aligned}
$$
where last equality follows from Proposition~\ref{proposition:nested-projection} that $\bH\bH_1 = \bH_1 = \bH_1\bH$. This implies $\RSS$ and $\ESS$ are independent, from which the results follow.
\end{proof}

Then, combining Theorem~\ref{theorem:anova-r2-test}, under the null hypothesis $\mathcalH_0: \bbeta = [\beta_0, 0, 0, \ldots, 0]^\top$, we conclude that 
$$
\rT=\frac{\frac{1}{n-p}\RSS}{\frac{1}{p-1}\ESS  } \sim F_{n-p,p-1},
$$
which is independent of $\sigma^2$ and is also known as the \textit{test statistic for the $F$-test}.

Now suppose we are given a dataset $(\bx_1, y_1), (\bx_2, y_2), \ldots, (\bx_n, y_n)$, and we observe a specific value  $\rT=t$ for this data. Then the value
$$
\widetildep=\Pr(\rT((\bx_1, y_1), (\bx_2, y_2), \ldots, (\bx_n, y_n)) \geq t) = \Pr(F_{n-p,p-1} \geq t),
$$
is known as the \textit{$p$-value}. We reject the hypothesis $\mathcalH_0$ if $\widetildep<\alpha$, given some small $\alpha$, say $\alpha=0.05$.

A moment of reflection would reveal that the $p$-value  is the chance of extreme cases when the hypothesis is true. If the $p$-value is small, then the hypothesis has a low  probability of being correct and the observed data is sufficiently unlikely under the hypothesis, we reject. If not, we fail to reject (note that not rejecting is not equal to accepting the hypothesis.). We will provide an example of how this $F$-test works in Figure~\ref{fig:f-dist-example}  later.

\subsection{Hypothesis Tests}\label{section:mklm_hypotest}
When we train a model using sample data, compute goodness-of-fit metrics, and draw conclusions about the model's performance, we must keep in mind that these results are based on random samples. As such, the goodness-of-fit metrics themselves are random variables, and our conclusions are derived from statistical inference---which does not always yield perfectly accurate results.
Therefore, it is essential to assess the reliability of our conclusions. This is precisely what statistical inference, and specifically hypothesis testing, aims to do \citep{panaretos2016statistics}.

In previous sections, we introduced common metrics used to evaluate model fit in linear models, including their definitions and computational methods. However, we did not yet explain how to use these metrics to make formal decisions or draw conclusions about the model. In this subsection, we explore how to interpret the values of these metrics to assess model quality and quantify the reliability of our conclusions.
One of the most commonly used tools in statistical inference is hypothesis testing. Among various testing procedures, the \textit{likelihood ratio test} and the \textit{Wald test} are two widely used approaches for evaluating linear models.

Suppose we wish to test the values of $\bbeta=\widehat{\bbeta}$, or more generally, whether $\bbeta$ takes on a specific value $\bb$. We consider the null and alternative hypotheses:
$$
\mathcalH_0: \bbeta=\bb 
\qquad\text{versus}\qquad
\mathcalH_0: \bbeta\neq \bb. 
$$
We can perform this test using either of the following two approaches:

\paragrapharrow{Wald test.}
An obvious candidate for a test statistic is the squared Mahalanobis distance of $\widehat{\bbeta}$ from $\bbeta$, otherwise known as the Wald statistic. Under $\mathcalH_0$,  by \eqref{equation:large_asy_tmp}, Theorem~\ref{theorem:samplding_dist_lse_gaussian} for Gauss-Markov models, or Theorem~\ref{theorem:larg-sample-hat-beta} for large samples, we have:
$$
\rW\triangleq (\widehat{\bbeta} - \bb)^\top \widehatbV^{-1} (\widehat{\bbeta} - \bb) \overset{a}{\sim} \chi^2_{(p)},
$$
where $\widehatbV=S^2 (\bX^\top\bX)^{-1}$ is the estimated covariance matrix of $\widehatbbeta$.
Thus, we reject $\mathcalH_0$ at significance level $\alpha$ if $\rW > \chi^2_{p,\alpha}$,
where $\chi^2_{\nu, \alpha}$ refers to the $\alpha$-th quantile of a Chi-squared distribution with $\nu$ degrees of freedom.
\paragrapharrow{Likelihood ratio test.}

An alternative is a likelihood ratio test. Define
$$
\Lambda \triangleq 2 \ln \left( \frac{\mathcalL(\widehat{\bbeta})}{\mathcalL(\bbeta)} \right) = 2 (\ell(\widehat{\bbeta}) - \ell(\bbeta)), 
$$
where $\mathcalL(\bbeta)$ denotes the likelihood function of $\bbeta$ under the observed data set.
To derive the distribution of $\Lambda$, we expand $\ell(\bbeta)$ around $\ell(\widehat{\bbeta})$ using a Taylor series:x
$$
\ell(\bbeta) \overset{a}{=} \ell(\widehat{\bbeta}) + (\bbeta - \widehat{\bbeta})^\top \frac{\ell(\widehat{\bbeta}) }{\partial \bbeta}- \frac{1}{2} (\bbeta - \widehat{\bbeta})^\top \widehatbV^{-1} (\bbeta - \widehat{\bbeta}).
$$
Since $\frac{\ell(\widehat{\bbeta}) }{\partial \bbeta} = \bzero$, the expression simplifies to:
$$
2 (\ell(\widehat{\bbeta}) - \ell(\bbeta)) \overset{a}{=} (\bbeta - \widehat{\bbeta})^\top \widehatbV^{-1} (\bbeta - \widehat{\bbeta}) \overset{a}{\sim} \chi^2_{(p)}.
$$
Under the null hypothesis $\mathcalH_0$, we have $\bbeta = \bb$, and thus:
$$
\Lambda = 2 (\ell(\widehat{\bbeta}) - \ell(\bb)) \overset{a}{\sim} \chi^2_{(p)}.
$$
Hence, we reject $\mathcalH_0$ at significance level $\alpha$ if $\Lambda > \chi^2_{p,\alpha}$.

\index{Model diagnostics}
\index{Residual analysis}
\section{Linear Model Diagnostics}\label{section:gau_diag}

The statistical perspective of the linear model offers a robust framework for diagnostics, providing a systematic approach to assess and validate the assumptions underlying the model. One of its main advantages lies in its ability to rigorously evaluate the appropriateness of the linear relationship between variables, ensuring that the model accurately reflects the data's inherent structure. By leveraging statistical tools such as residual analysis, this perspective enables practitioners to identify potential violations of key assumptions like linearity, homoskedasticity, and independence of errors. This diagnostic capability is crucial for enhancing model reliability and validity, allowing for informed adjustments and refinements that ultimately lead to more accurate predictions and insights. Through a thorough examination of residuals and their patterns, analysts can detect outliers, influential observations, and other anomalies that might otherwise go unnoticed, thereby fostering a deeper understanding of the data and the relationships it encapsulates.

\index{Statistical leverage score}
\index{Cook's distance}
\index{Leverage point}
\subsection{Statistical Leverage Scores}
One  advantage of the statistical perspective of the linear model is the ability to identify and assess \textit{influential data points} through the use of \textit{leverage scores} and related metrics. Leverage scores, derived from the diagonal elements of the projection matrix (or the hat matrix), provide a quantitative measure of how far an observation's predictor values are from the mean of the predictors. This is crucial because observations with high leverage have the potential to exert significant influence on the fitted regression line, potentially skewing the results if they are outliers or contain errors.

In the literature, the outliers and the influential observations may be considered separately. However, in a sense, they both have the  effect of ``pulling" the regression line (surface) toward them.
Normally,  an \textit{outlier} can be identified by comparing the individual residual $e_i=y_i-\widehaty_i$ to the average residual.
That is, outliers are points falling far from the cloud surrounding the regression line. 
More formally, we mentioned in Section~\ref{section:by-geometry-hat-matrix} that the diagonal values of the projection matrix or the hat matrix ${\bH = \bX(\bX^\top\bX)^{-1}\bX^\top}$ are called \textit{statistical leverage scores}.
These leverage scores have been used extensively in classical regression diagnostics to identify potential outliers by, e.g., flagging data points with leverage score greater than 2 or 3 times the average value in order to be investigated as errors or potential outliers.

By Theorem~\ref{theorem:samplding_dist_lse_gaussian}, for the $i$-th observation $(\bx_i, \ry_i)$, we have 
\begin{equation}\label{equation:leve_var_ei}
\Var[\ry_i - \widehat{\ry}_i] = \Var[\re_i] = \sigma^2 (1 - h_{ii}).
\end{equation}
If $h_{ii} \approx 1$ (the $i$-th leverage score), then the model is constrained so $\widehat{\ry}_i = \bx_i^\top \widehatbbeta \approx \ry_i$.

However, since $\sum_{j=1}^{n} h_{ii} \equiv \trace(\bH) = \trace(\bX(\bX^\top\bX)^{-1}\bX^\top)=\trace((\bX^\top\bX)^{-1}\bX^\top\bX)=p$, it is not possible for all data points to have low leverage scores.
A balanced distribution of leverage would correspond to $h_{ii} = p/n$ for each observation $i$.
Therefore, the assumption $\max_{j \leq n} h_{ii} \xrightarrow{n \to \infty} 0$ is satisfied in Theorem~\ref{theorem:larg-sample-hat-beta}.
In practice,  if a leverage score satisfies $h_{ii} > 2p/n$ or $h_{ii} > 3p/n$, 
the corresponding observation is considered a \textit{leverage point}, and the model should be examined more closely---such as by refitting the model without the $i$-th observation to assess its influence on the results.

By examining these leverage scores, analysts can pinpoint data points that may require further investigation, ensuring that any anomalies do not unduly affect the model's predictions. Moreover, the integration of Cook's distance---a metric that combines information about both the residual and leverage---enables a comprehensive evaluation of each observation's impact on the overall model fit.
To see this,  we drop out the $i$-th observation, i.e., the potential leverage point. 
And let $\widehat{\bbeta}_{-i}$ be the least squares estimate when model is fitted to data without observation $i$, and let $\widehat{\rvy}_{-i} = \bX\widehat{\bbeta}_{-i}$ be the corresponding fitted value.
Using the \textit{Sherman-Morrison formula} \eqref{equation:she_morr_for}, it can be shown that 
\begin{equation}\label{equation:cook_b_bi}
\widehatbbeta - \widehatbbeta_{-i} 
=
(\bX^\top\bX)^{-1}\bx_i \frac{\ry_i - \widehat{\ry_i}}{1-h_{ii}}
\qquad\implies\qquad 
\Exp[\widehatbbeta - \widehatbbeta_{-i} ] = \bzero;
\end{equation}
see Problem~\ref{prob:cook_b_bi}.
By \eqref{equation:leve_var_ei}, the covariance matrix is
\begin{equation}
\Cov\left[\widehatbbeta - \widehatbbeta_{-i} \right] = \frac{\sigma^2}{1-h_{ii}} (\bX^\top \bX)^{-1} \bx_i \bx_i^\top (\bX^\top \bX)^{-1}.
\end{equation}
The rank of $\Cov[\widehatbbeta - \widehatbbeta_{-i} ]$ is one for nonzero $\bx_i$. The only nonzero eigenvalue of $\Cov[\widehatbbeta - \widehatbbeta_{-i} ]$ is $\sigma^2 \bx_i^\top (\bX^\top \bX)^{-2} \bx_i / (1 - h_{ii})$ and its associated eigenvector is $(\bX^\top \bX)^{-1} \bx_i$. 
When we denote by $\mathcalV$ the one-dimensional subspace generated by $(\bX^\top \bX)^{-1} \bx_i$ of the $p$-dimensional Euclidean space, the subspace $\mathcalV$ is just a line along which the eigenvector $(\bX^\top \bX)^{-1} \bx_i$ lies, and each $\widehatbbeta - \widehatbbeta_{-i} $ has a distribution with which a random variable takes on values in the set $\mathcalV$ with probability one \citep{kim2017cautionary}.

In order to investigate the change in the value of $\widehatbbeta$ due to a deletion of observation $i$, \citet{cook1982residuals} introduced an influence measure/distance based on the geometry of confidence ellipsoids as follows:
\begin{equation}\label{equation:cook_dist}
C_i \triangleq \frac{1}{p S^2} (\widehatbbeta - \widehatbbeta_{-i} )^\top (\bX^\top \bX) (\widehatbbeta - \widehatbbeta_{-i} )
=
\frac{1}{p S^2} (\widehat{\rvy} - \widehat{\rvy}_{-i})^\top (\widehat{\rvy} - \widehat{\rvy}_{-i})
= \frac{e_i^2 h_{ii}}{pS^2 (1 - h_{ii})^2}.
\end{equation}
where $S^2$ denotes the unbiased estimate of $\sigma^2$ (Theorem~\ref{theorem:ss-chisquare}). \textit{Cook's distance} thus measures scaled distance between $\widehat{\rvy}$ and $\widehat{\rvy}_{-i}$.
The last equality of \eqref{equation:cook_dist} shows that large $C_i$ implies large $e_i$ and large $h_{ii}$.
In practice,  a Cook's distance $C_i > 8/(n - 2p)$ worth a closer look, indicating that the $i$-th observation has a substantial influence on the model fit and deserves closer inspection.

\subsection{Gauss-Markov Assumptions}\label{section:gaus_mk_ana}

In Chapter~\ref{sec:lr-gaussian-noise}, we discussed that the Gauss-Markov linear model relies on four fundamental assumptions:
\begin{itemize}
\item \textit{Linearity.} The expected value of $\rvy$ is linearly related to the predictor matrix $\bX$, i.e, $\Exp[\rvy]=\bX\bbeta$.
\item \textit{Homoskedasticity.} The conditional variance of the errors is constant: $\Var[\epsilon_i \mid \bx_i]=\Exp[\epsilon_i^2 \mid \bx_i] = \sigma^2$  for all $i=1,2,\ldots,n$.
\item \textit{Gaussian distribution.} The errors are normally distributed: $\epsilon_i\sim \normal(0, \sigma^2)$.
\item \textit{Independence of errors.} The error terms $\epsilon_i$ and $\epsilon_j$ are independent for any $i\neq j$.
\end{itemize}

If any of these assumptions is clearly violated, then the Gauss-Markov model may no longer be appropriate for modeling the data. In such cases, it becomes important to evaluate whether there is evidence supporting or contradicting these assumptions.
The primary tool used to assess these assumptions is the residual vector, which captures the part of the response variable $\rvy$ that cannot be explained by the predictor variables in $\bX$. Recall that the residuals are given by:
$$
\rve = \rvy - \widehat{\rvy} = \rvy - \bX \widehatbbeta = (\bI - \bH)\rvy = (\bI - \bH)\bepsilon,
$$
where $\bH$ is the hat matrix.

Now, if the model is correctly specified and the errors follow a normal distribution, that is, $\bepsilon \sim \normal(\bzero, \sigma^2 \bI)$, then the residuals should also follow a normal distribution: 
$$
\rve \sim \normal(\bzero, \sigma^2(\bI - \bH));
$$ 
see Theorem~\ref{theorem:samplding_dist_lse_gaussian}.
Under this assumption, each residual random variable $\re_i$ has the distribution:
$$
\re_i \sim \normal\{0, \sigma^2(1 - h_{ii})\}
\qquad \text{and}\qquad 
\Cov[\re_i, \re_j] = -\sigma^2 h_{ij}.
$$
This means that the residuals are not independent---they are correlated---and they have unequal variances. 
To address this, we can attempt to ``decorrelate" the residuals using the spectral decomposition of the hat matrix.
We start by decorrelating 
$$
\be \leftarrow \bQ^\top \be 
\qquad \implies\qquad 
\be \sim \normal(\bzero, \sigma^2 (\bI-\bLambda)),
$$ 
where $\bH = \bQ \bLambda \bQ^\top$ denotes the spectral decomposition of $\bH$.
Then, we compute  the standardized residuals:
$$
e_i \triangleq \frac{e_i}{S\sqrt{1 - \lambda_{ii}}}, \ \forall\, i = 1,2, \ldots, n,
$$
where $S^2$ denotes the unbiased estimate of $\sigma^2$ (Theorem~\ref{theorem:ss-chisquare}).
These processed residuals are  uncorrelated and have variance $\approx 1$.

\paragrapharrow{Diagnostics for linearity.}
To check for linearity, a simple approach is to examine plots of the residual vector against each covariate. Under the assumption of linearity, we have $ \bX^\top \be = \bzero$ (Theorem~\ref{theorem:rank_def_ls_prop}), which implies that there should be no correlation between the covariates and the residuals.

Therefore, we plot the standardized residuals $\be$ against each covariate (i.e., the columns of $\bX$). These plots should not display any systematic patterns. The presence of a clear pattern suggests an incorrect specification of the relationship between the response and the corresponding covariate---for example, it may indicate the need to include a transformation of that explanatory variable in the model. As shown in Figure~\ref{fig:diag_linear_no}, a sinusoidal pattern might suggest a sine-like relationship between the covariate and the residuals.

In addition, these plots can aid in variable selection. We can also plot the standardized residuals $\be$ against variables that were not included in the model. In this case, no systematic pattern should be present either. If such a pattern appears, it indicates that an omitted variable may be important and should be considered for inclusion. Conversely, if no such pattern exists, the excluded variable may be irrelevant and could potentially be removed from consideration.

\begin{figure}[h]
\centering  
\vspace{-0.35cm} 
\subfigtopskip=2pt 
\subfigbottomskip=6pt 
\subfigcapskip=-2pt 
\subfigure[Linearity.]{\includegraphics[width=0.4\textwidth]{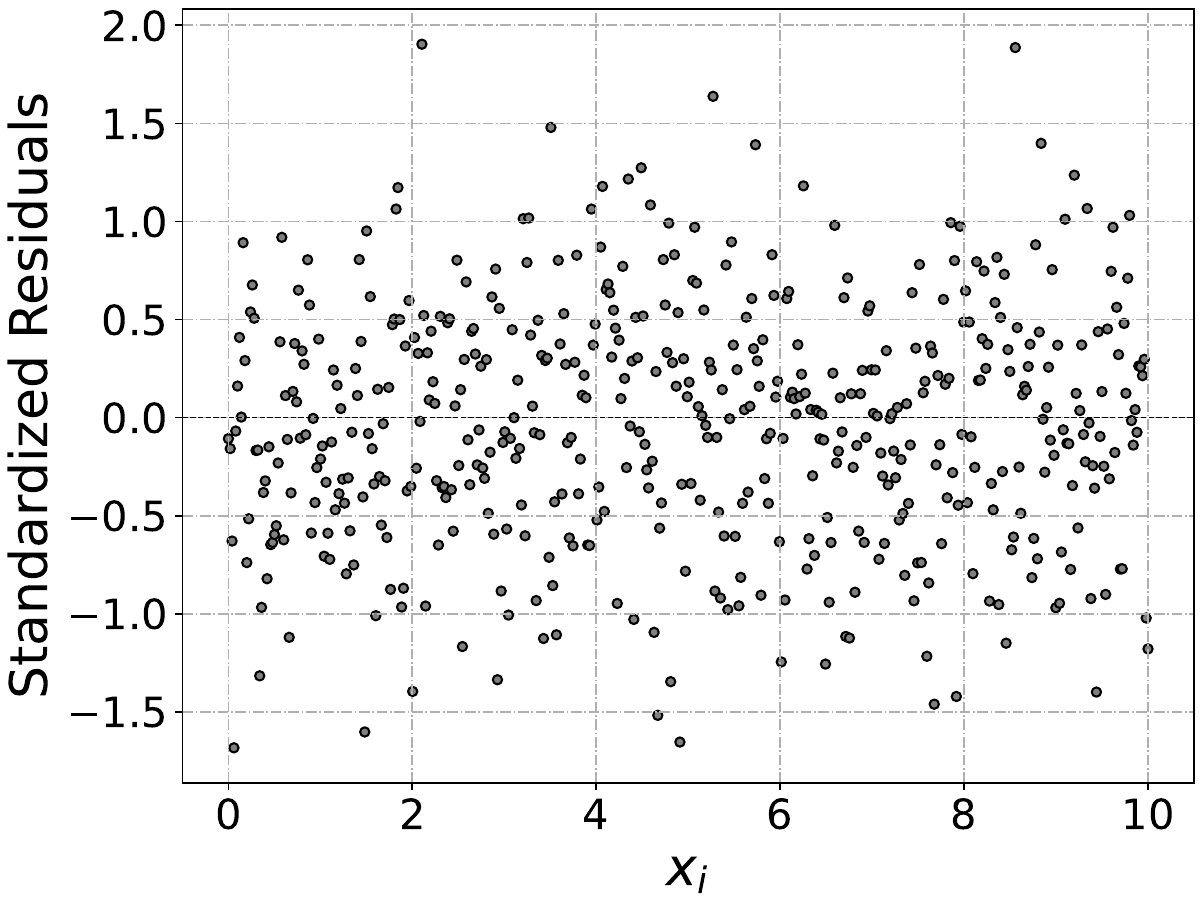} \label{fig:diag_linear_yes}}
\subfigure[Nonlinearity.]{\includegraphics[width=0.4\textwidth]{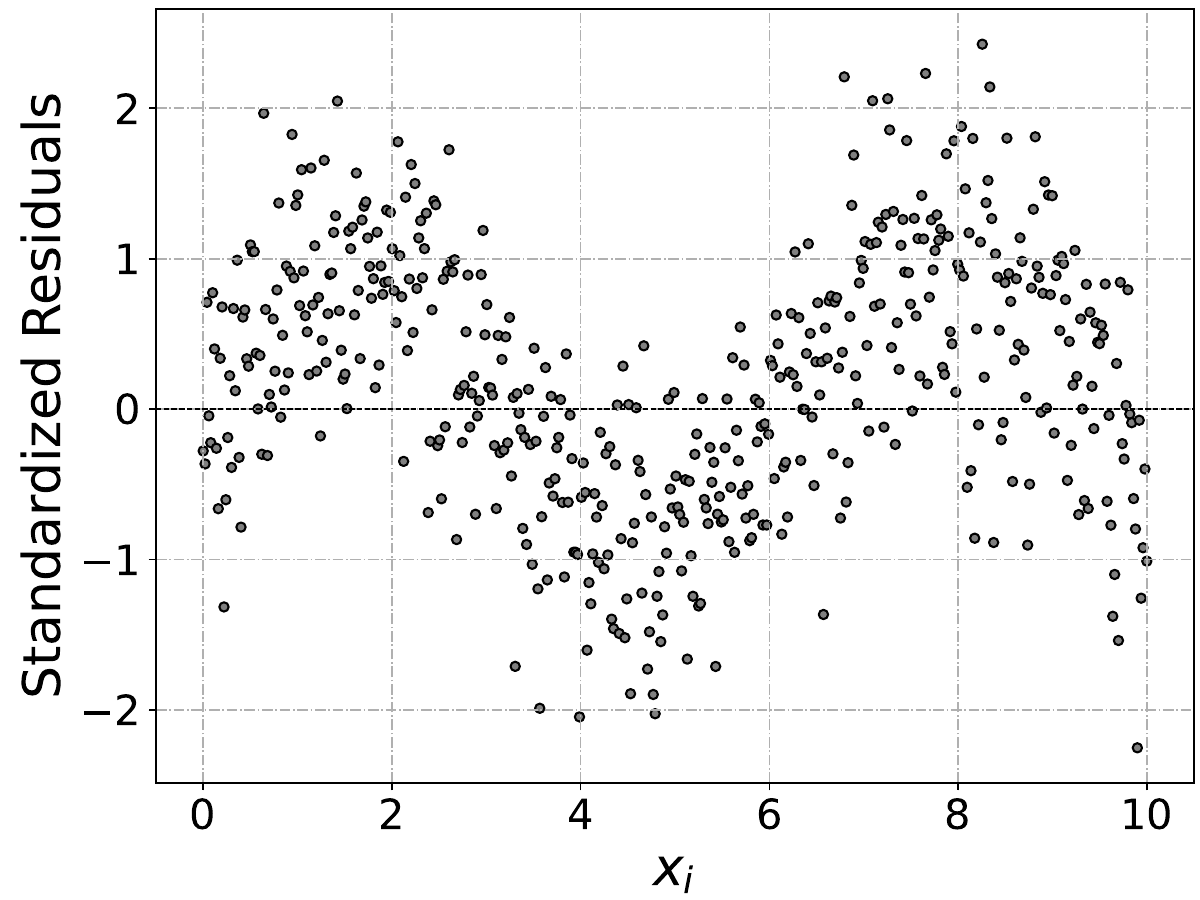} \label{fig:diag_linear_no}}
\caption{The relationship between the $i$-th covariate  and the residual.}
\label{fig:diag_linear}
\end{figure}

\paragrapharrow{Diagnostics for homoskedasticity.}
Under the assumption of homoskedasticity, the variance of the error terms $\epsilon_i$is constant across all observations:
$$
\Var[\epsilon_i] = \sigma^2, \ \forall\, i=1,2,\ldots,n.
$$
Since the residuals $\rve$ and the fitted values $\widehat{\rvy}$ are independent (Lemma~\ref{theorem:independence_samplding_dist_lse_gaussian}), we can plot $\be$ against  $\widehat{\rvy}$.

Ideally, the plot should show a random scatter of points with approximately constant vertical spread across all fitted values; see Figure~\ref{fig:diag_homo_yes}. A consistent spread indicates that the homoskedasticity assumption is likely satisfied.

However, the presence of a discernible pattern---such as a funnel shape or increasing/decreasing spread---suggests a violation of the homoskedasticity assumption, indicating heteroskedasticity; see Figure~\ref{fig:diag_homo_no}.

\begin{figure}[h]
\centering  
\vspace{-0.35cm} 
\subfigtopskip=2pt 
\subfigbottomskip=6pt 
\subfigcapskip=-2pt 
\subfigure[Homoskedastic.]{\includegraphics[width=0.4\textwidth]{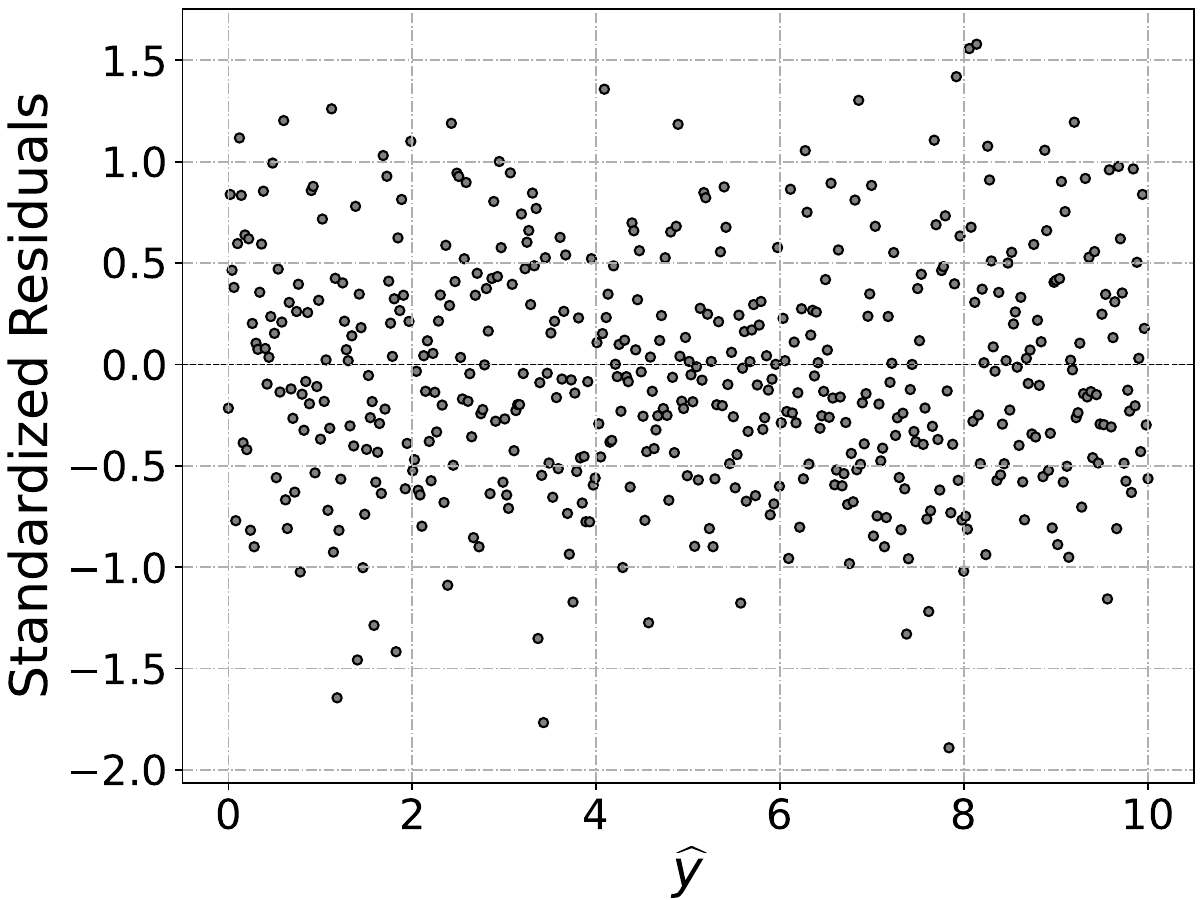} \label{fig:diag_homo_yes}}
\subfigure[Heteroskedastic.]{\includegraphics[width=0.4\textwidth]{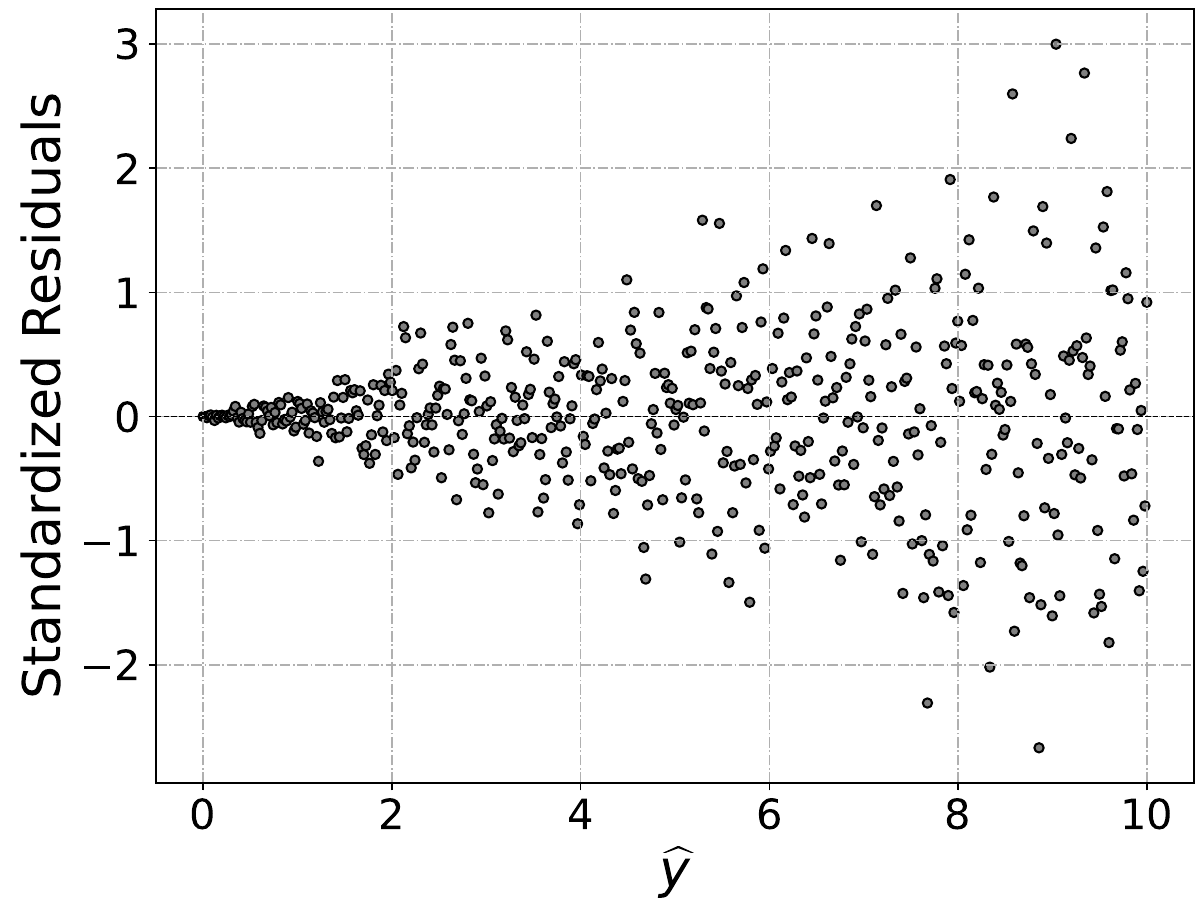} \label{fig:diag_homo_no}}
\caption{Residuals showing constant spread (homoskedastic) and varying spread (heteroskedastic).}
\label{fig:diag_homo}
\end{figure}

\begin{figure}[h]
\centering  
\vspace{-0.35cm} 
\subfigtopskip=2pt 
\subfigbottomskip=6pt 
\subfigcapskip=-2pt 
\subfigure[Gaussian.]{\includegraphics[width=0.4\textwidth]{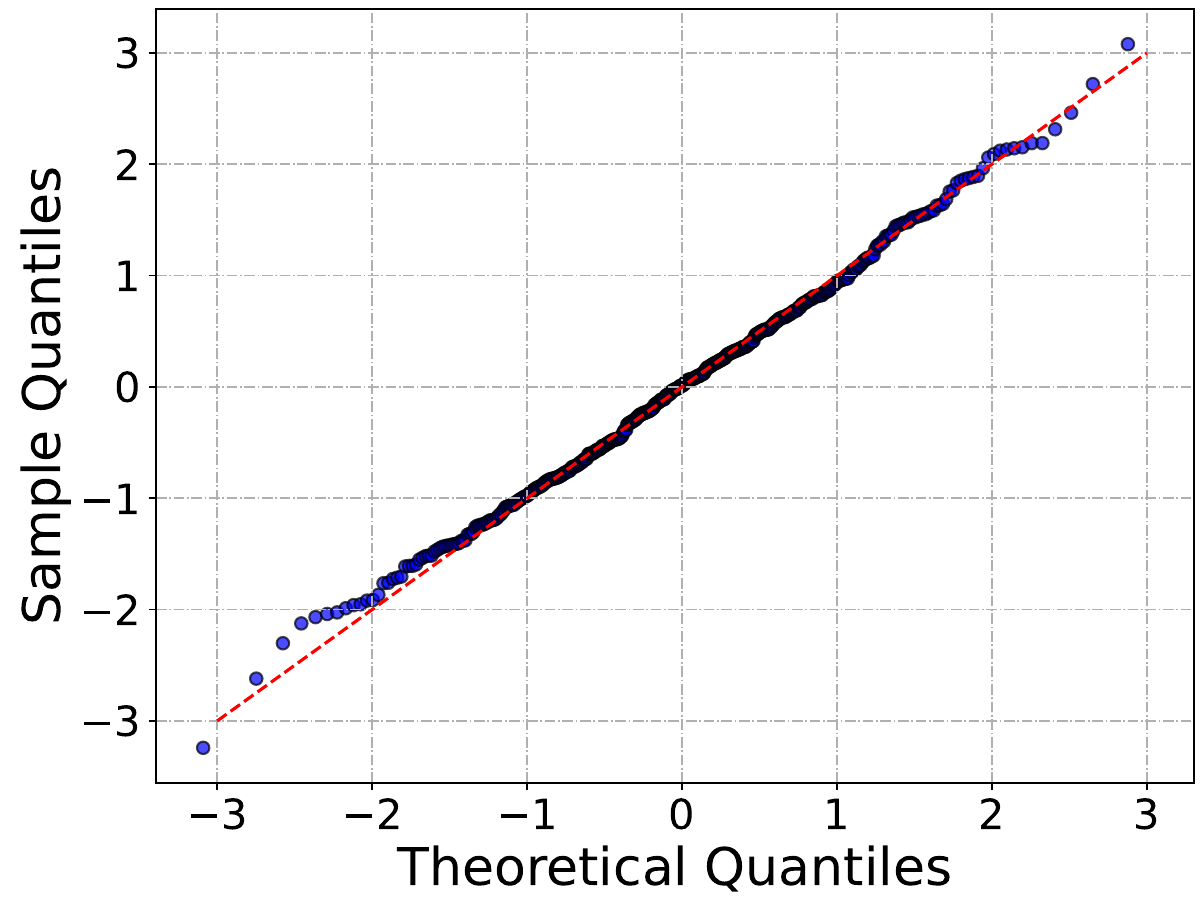} \label{fig:diag_qq_yes}}
\subfigure[Non-Gaussian.]{\includegraphics[width=0.4\textwidth]{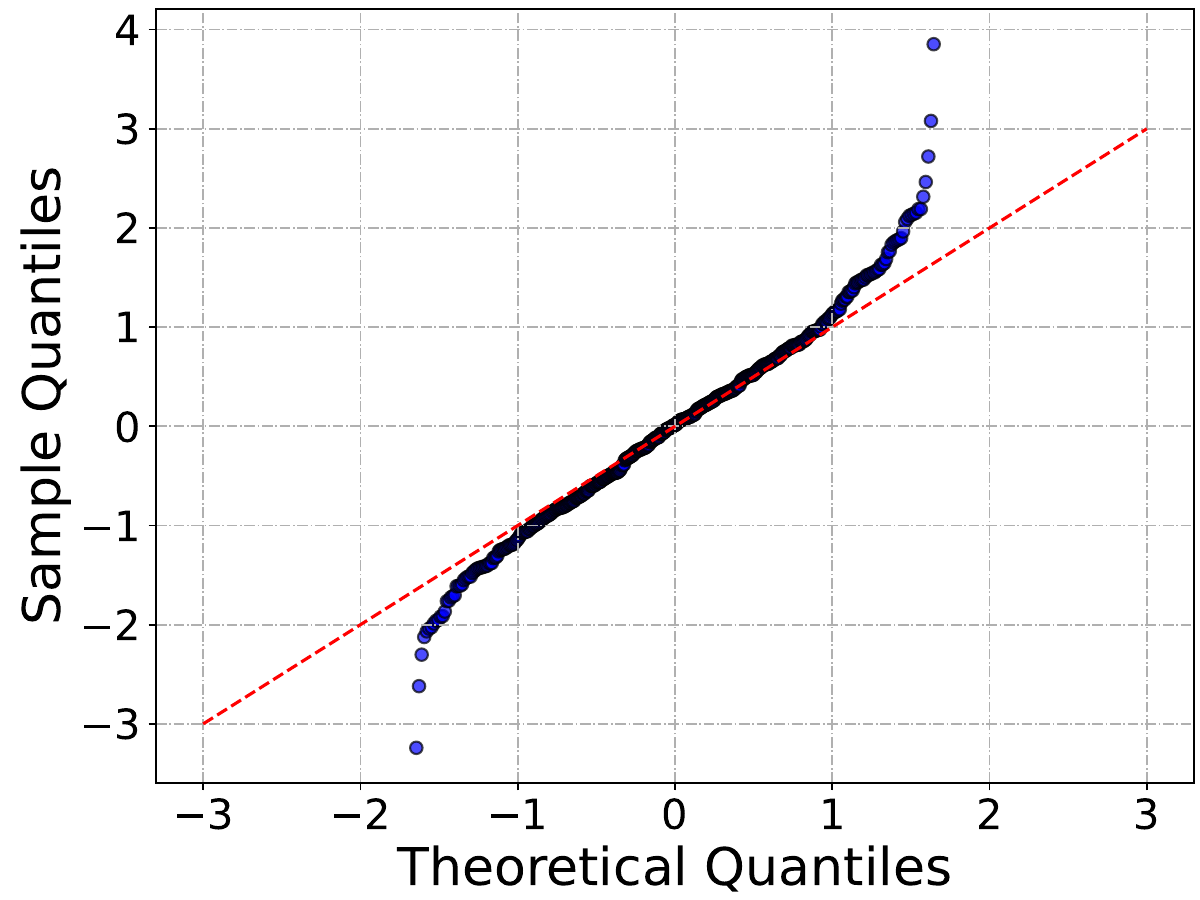} \label{fig:diag_qq_no}}
\caption{QQ plot for an empirical Gaussian and a non-Gaussian distribution.}
\label{fig:diag_qq}
\end{figure}

\index{QQ plot}
\paragrapharrow{Diagnostics for normality.}

A simple and effective method to assess normality is to compare the distribution of standardized residuals to a Gaussian distribution by using a \textit{quantile-quantile (QQ) plot}. This involves plotting the empirical quantiles of the residuals against the corresponding theoretical quantiles from a standard normal distribution.

Recall that the $\alpha$-quantile ($\alpha \in [0, 1]$) of a distribution $F$ is the value $F^{-1}(\alpha)$ defined as
$$ 
F^{-1}(\alpha) \triangleq \inf\{t \in \mathbb{R} : F(t) \geq \alpha\}. 
$$
Given a set of samples $y_1, y_2,\ldots, y_n$, the empirical $\alpha$-quantile is the value defined as
$$
\widehat{F}^{-1}(\alpha) \triangleq \inf\{t \in \mathbb{R} : \widehat{F}(t) \geq \alpha\} = \inf\left\{t \in \mathbb{R} : \frac{\#\{y_i \leq t\}}{n} \geq \alpha\right\},
$$
where $\widehat{F}$ denotes the empirical cumulative distribution function.

A QQ plot compares the empirical quantiles of a dataset to the theoretical quantiles of a reference distribution---in this case, the standard normal distribution. If the data are normally distributed, we expect the plotted points to approximately lie along a straight line at a $45^\circ$ angle.
To construct a QQ plot for standardized residuals:
\begin{itemize}
\item Sort the standardized residuals in ascending order: $ e_{(1)} \leq e_{(2)} \leq \cdots \leq e_{(n)} $~\footnote{$e_{(i)}$ denotes the $i$-th smallest value in $e_1, e_2, \ldots, e_n$.}.
\item Plot the empirical $\left(\frac{k}{n}\right)_{k=1}^n$ quantiles of standardized residuals $ e_{(1)} \leq e_{(2)} \leq \cdots \leq e_{(n)} $ against theoretical quantiles $\Phi^{-1}\left(\frac{1}{n+1}\right), \Phi^{-1}\left(\frac{2}{n+1}\right), \ldots, \Phi^{-1}\left(\frac{n}{n+1}\right)$ of a $\normal(0, 1)$ distribution, where $\Phi(y) = \int_{-\infty}^{y} \normal(u\mid 0,1)du= \frac{1}{\sqrt{2\pi}} \int_{-\infty}^{y} \exp(-\frac{u^2}{2}) du $ denotes the cumulative distribution function of $\normal(0,1)$.
 Note that here we use  $\Phi^{-1}\left(\frac{k}{n+1}\right)$ instead of $\Phi^{-1}\left(\frac{k}{n}\right)$ to account for the fact that the $k$-th order statistic is not exactly at the $\frac{k}{n}$-th quantile of the population.
\end{itemize}
If the residuals follow a normal distribution, the points in the QQ plot should lie close to the $45^\circ$ reference line. Substantial deviations from this line suggest departures from normality; see Figure~\ref{fig:diag_qq_no} for an example.

\index{Clustering}
\paragrapharrow{Diagnostics for independence.}
In practice, verifying the assumption of independent errors---i.e., that  $\Cov[\rve] = \sigma^2 \bI$---can be challenging. One common issue to look for is clustering in the residuals, which might suggest the presence of dependence among observations.

For example, if the data include groups of related individuals or repeated measurements, their responses may be correlated, leading to correlated residuals. While clustering and its implications are an important topic, a detailed discussion of clustering algorithms and models for dependent data is beyond the scope of this book and we will not provide the details.

\begin{figure}[h!]
\centering
\includegraphics[width=0.5\textwidth]{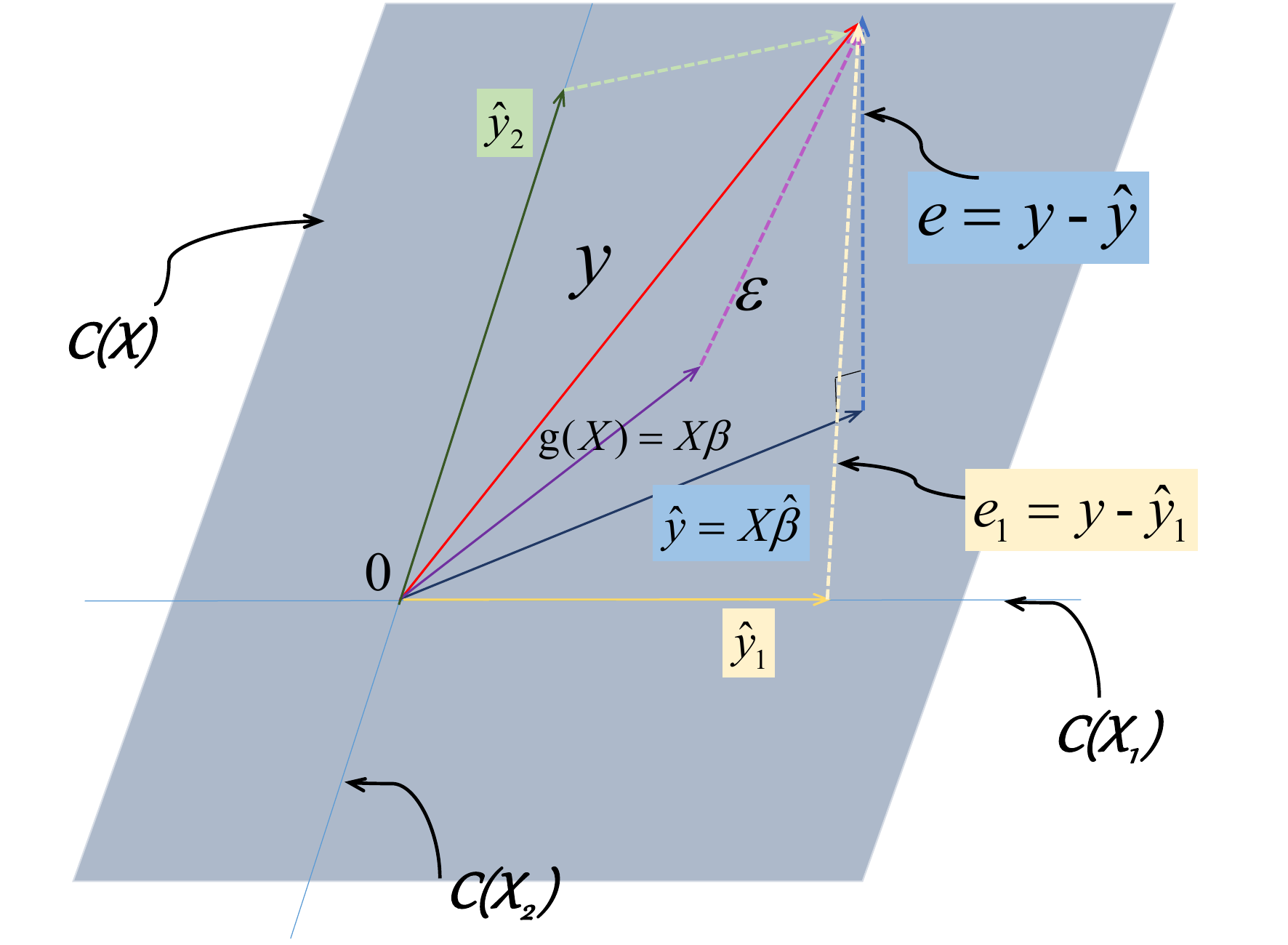}
\caption{Projection onto the hyperplane of $\cspace(\bX)$ and $\cspace(\bX_1)$.}
\label{fig:ls_geometric3}
\end{figure}
\index{Variable selection}
\section{Linear Model Variable Selection}\label{section:variable-selection}

In Section~\ref{section:gaus_mk_ana}, under the diagnostics for linearity, we discussed that a systematic pattern in the linearity plot (see, for example, Figure~\ref{fig:diag_linear_no}) indicates the omission of a necessary covariate. This suggests the need to include this variable in our model. Conversely, if no such pattern is observed, it may indicate that an irrelevant covariate can be removed from the model.

This observation can be integrated into a variable selection procedure. Furthermore, in this section, we illustrate how the ANOVA decomposition presented in Theorem~\ref{theorem:anova-r2-test} can be applied within the variable selection process.
Incorporating these insights allows for a more systematic approach to model refinement by ensuring that only relevant variables are retained, based on their contribution to explaining variance.

In Section~\ref{section:by-geometry-hat-matrix}, we showed that $\widehat{\rvy}$ is the projection of $\rvy$ onto the column space of $\bX$, denoted $\cspace(\bX)$. Assuming $n\geq p$, the matrix $\bX\in\real^{n\times p}$ has full rank, and thus the dimension of $\cspace(\bX)$ is $p$. 
If we decompose $\bX$ into two blocks
$$
\bX = [\bX_1, \bX_2],
$$
where $\bX_1\in \real^{n\times q}$ and $\bX_2\in \real^{n\times (p-q)}$ with $q<p$. That is, $\bX_1$ contains the first $q$ columns of $\bX$, and $\bX_2$ consists of the remaining columns (alternatively, we can choose $\bX_1$ as random $q$ columns from $\bX$, and $\bX_2$ as the rest).  
Then, the linear model can be expressed as:
$$
\rvy = \bX\bbeta +\bepsilon =
[\bX_1, \bX_2]
\begin{bmatrix}
\bbeta_1 \\
\bbeta_2
\end{bmatrix} +\bepsilon
=\bX_1\bbeta_1 + \bX_2\bbeta_2+\bepsilon.
$$
A natural question arises: Can we set $\bbeta_2 = \bzero$ without significantly increasing the reconstruction error compared to the case when $\bbeta_2\neq \bzero$? 
In other words, are the last $p-q$ variables redundant in the context of the least squares model?
Equivalently, can the submodel defined  solely by $\bbeta_1$ perform as well as the full model defined by  $\bbeta$?

If we consider only the first $q$ variables, the projection of $\rvy$ onto the column space of $\bX_1$ is achieved using the hat matrix $\bH_1 = \bX_1 (\bX_1^\top\bX_1)^{-1}\bX_1^\top$. 
Consequently,  the projection of $\rvy$ is $\widehat{\rvy}_1 = \bH_1\rvy$, and the corresponding error vector is $\rve_1 = \rvy - \widehat{\rvy}_1$. 
The scenario  is illustrated  in Figure~\ref{fig:ls_geometric3}.

For $\widehat{\bbeta}$, we define the residual sum of squared error by $\RSS(\widehat{\bbeta}) \triangleq \normtwo{\rvy - \widehat{\rvy}}^2 = \normtwo{\rve}^2$. Applying Pythagoras' theorem (Remark~\ref{remark:pythagoras}), we derive the following relationships; see also Figure~\ref{fig:ls_geometric3}:
\begin{subequations}
\begin{align}
\RSS(\widehat{\bbeta}_1) &= \RSS(\widehat{\bbeta}) + \normtwo{\rve_1 - \rve}^2; \\
\normtwo{\rve_1}^2 &= \normtwo{\rve}^2 + \normtwo{\rve_1 - \rve}^2;\\
\normtwo{\rvy - \widehat{\rvy}_1}^2 &= \normtwo{\rvy - \widehat{\rvy}}^2 + \normtwo{\widehat{\rvy} - \widehat{\rvy}_1}^2. \label{equation:var_sec_two}
\end{align}
\end{subequations}
This shows that $\RSS(\widehat{\bbeta}_1) \geq \RSS(\widehat{\bbeta})$. Therefore, to decide whether we can set $\bbeta_2=\bzero$ reduces to comparing how much larger $\RSS(\widehat{\bbeta}_1)$ is compared to $\RSS(\widehat{\bbeta})$.
\begin{theoremHigh}[Sampling distribution for variable selection]\label{theorem:anova-model-selection}
Let $\rvy = \bX\bbeta + \bepsilon$, where $\bepsilon \sim \normal(\bzero, \sigma^2 \bI)$. 
And assume that $\bX\in \real^{n\times p}$ is fixed and has full rank with $n\geq p $ (i.e., its rank is $p$ so that $\bX^\top \bX$ is invertible). 
Suppose we decompose the variables into two parts such that $\bX = [\bX_1, \bX_2]$, corresponding to parameters $\bbeta_1$ and $\bbeta_2$, respectively (with 
$\bbeta=\begin{bmatrixfoot}
\bbeta_1 \\
\bbeta_2
\end{bmatrixfoot} $). 
Then, the following results hold:
\begin{enumerate}[(i)]
\item  $(\rve - \rve_1 )\perp \rve$;
\item  $\RSS(\widehat{\bbeta})$ and $\RSS(\widehat{\bbeta}_1) - \RSS(\widehat{\bbeta})$ are independent;
\item  Under the null hypothesis $\mathcalH_0: \bbeta_2 = \bzero$, $\normtwo{\rve-\rve_1}^2\sim \sigma^2\chi_{(p-q)}^2$. 
\end{enumerate}
\end{theoremHigh}
\begin{proof}[of Theorem~\ref{theorem:anova-model-selection}]
\textbf{(i).} We begin by noting that $\rve - \rve_1 =- \widehat{\rvy}+ \widehat{\rvy}_1$. 
Since $\widehat{\rvy}\in \cspace(\bX)$ and $\widehat{\rvy}_1 \in \cspace(\bX_1)$, $\rve - \rve_1 \in \cspace(\bX)$. 
Additionally, $\rve$ is orthogonal to the column space of $\bX$. Therefore, $(\rve - \rve_1) \perp \rve$; see Figure~\ref{fig:ls_geometric3}.

\paragraph{(ii).}
From Proposition~\ref{proposition:nested-projection}, since $\cspace(\bX_1) \subseteq \cspace(\bX)$, we have $\bH_1=\bH_1\bH$. 
Then, $\rve_1 = (\bI-\bH_1)\rvy = (\bI-\bH_1\bH)\rvy$ . Therefore, 
$$
\begin{aligned}
\rve&= (\bI-\bH)\rvy ;\\
\rve - \rve_1 &= (\bI-\bH)\rvy - (\bI-\bH_1\bH)\rvy = (\bH_1-\bI)\bH\rvy.
\end{aligned}
$$
Recall that $\rvy \sim \normal(\bX\bbeta, \sigma^2\bI)$. Then, the covariance between $\rve$ and $\rve-\rve_1$ is given by:
$$
\begin{aligned}
\Cov[\rve, \rve-\rve_1] &= \Cov[(\bI-\bH)\rvy, (\bH_1-\bI)\bH\rvy] 
= (\bI-\bH)\Cov[\rvy, \rvy] \bH^\top (\bH_1-\bI)^\top \\
&= (\bI-\bH)(\sigma^2\bI) \bH (\bH_1-\bI) 
= (\bH - \bH^2) (\bH_1-\bI) 
=\bzero, 
\end{aligned} 
$$
where the second equality uses the identity $\Cov[\bA\rvx, \bB\rvy] = \bA\Cov[\rvx, \rvy]\bB^\top$ for deterministic  matrices $\bA$ and $\bB$, and the last equality follows from the idempotency of $\bH$. 
Thus, $\RSS(\widehat{\bbeta})$ and $\RSS(\widehat{\bbeta}_1) - \RSS(\widehat{\bbeta})$ are independent.

\paragraph{(iii).}
Next, consider the distribution of $\rve - \rve_1$:
$$
\begin{aligned}
\rve - \rve_1 &= (\bH_1-\bI)\bH\rvy
\sim \normal\left((\bH_1-\bI)\bH\bX\bbeta, (\bH_1-\bI)\bH(\sigma^2\bI)\bH^\top(\bH_1-\bI)^\top\right).
\end{aligned}
$$
Once again, from Proposition~\ref{proposition:nested-projection}, we know that $\bH\bH_1 = \bH_1 = \bH_1\bH$, so:
$$
\begin{aligned}
\rve - \rve_1 
&\sim \normal\left((\bH_1-\bH)\bX\bbeta, \sigma^2(\bH-\bH_1)\right)
\stackrel{\dag}{=} \normal\left((\bH_1-\bI)\bX\bbeta, \sigma^2(\bH-\bH_1)\right)\\
&= \normal\left((\bH_1-\bI)(\bX_1\bbeta_1 + \bX_2\bbeta_2), \sigma^2(\bH-\bH_1)\right) \\
&\stackrel{\ddag}{=} \normal\left( (\bH_1-\bI)\bX_2\bbeta_2, \sigma^2(\bH-\bH_1)\right) 
\stackrel{*}{=} \normal\left(\bzero, \sigma^2(\bH-\bH_1)\right),
\end{aligned}
$$
where the equality ($\dag$) follows from the fact that $\bH\bX = \bX$,  the equality ($\ddag$) follows from the fact that $(\bH_1-\bI)\bX_1 = \bzero$, and the equality ($*$) follows since we \text{assume $\mathcalH_0: \bbeta_2=\bzero$}.
By Proposition~\ref{proposition:nested-projection}, we have $(\bH-\bH_1)^\top = \bH-\bH_1$ and $(\bH-\bH_1)^2 = \bH-\bH_1$. 
Thus, it follows that 
$$
\begin{aligned}
\rve - \rve_1 &\sim \normal\left(\bzero, \sigma^2(\bH-\bH_1)^2\right). 
\end{aligned}
$$
The affine transformation of multivariate normal distribution (Lemma~\ref{lemma:affine_mult_gauss}) implies that $\rve-\rve_1$ and $(\bH-\bH_1)\bepsilon$ have the same distribution. Thus, $\normtwo{\rve-\rve_1}^2$ and $\bepsilon^\top (\bH-\bH_1)\bepsilon$ have the same distribution.

By Spectral Theorem~\ref{theorem:spectral_theorem} and Lemma~\ref{proposition:eigenvalues-of-projection} (which states that the only possible eigenvalues of the hat matrix are 0 and 1), we can rewrite by $\normtwo{\rve-\rve_1}^2 \stackrel{d}{=} \bepsilon^\top(\bH-\bH_1)\bepsilon = \bepsilon^\top(\bQ \bLambda\bQ^\top)\bepsilon$, where $\bH-\bH_1=\bQ \bLambda\bQ^\top$ is the spectral decomposition of $\bH-\bH_1$. 
Using Lemma~\ref{lemma:rotat_multi_gauss}, rotations do not affect the distribution of a multivariate normal vector, we thus have
$$
\boldeta \triangleq \bQ^\top\bepsilon \sim \normal(\bzero, \sigma^2\bI)
\qquad\implies \qquad 
\normtwo{\rve-\rve_1}^2 \stackrel{d}{=}  \boldeta^\top \bLambda \boldeta \sim \sigma^2 \chi^2_{\mathrm{rank(\bH-\bH_1)}}\sim\sigma^2 \chi^2_{(p-q)},
$$
where by Lemma~\ref{lemma:rank-of-symmetric-idempotent}, we have 
$$
\begin{aligned}
\rank(\bH-\bH_1) 
&= \trace(\bH) - \trace(\bH_1)
=\trace(\bX (\bX^\top\bX)^{-1}\bX^\top)- \trace(\bX_1 (\bX_1^\top\bX_1)^{-1}\bX_1^\top)\\
&=\trace( (\bX^\top\bX)^{-1}\bX^\top\bX)- \trace( (\bX_1^\top\bX_1)^{-1}\bX_1^\top\bX_1)
=p-q.
\end{aligned}
$$
This completes the proof.
\end{proof}

\index{$F$-test}
\subsection{$F$-test}

Then, combining $\RSS(\widehat{\bbeta})\sim \sigma^2 \chi^2_{(n-p)}$ in Theorem~\ref{theorem:ss-chisquare}, under the hypothesis $\mathcalH_0: \bbeta_2 = \bzero$, we conclude that 
$$
\rT=\frac{\frac{1}{p-q}\left(\RSS(\widehat{\bbeta}_1) - \RSS(\widehat{\bbeta})\right)  }{\frac{1}{n-p}\RSS(\widehat{\bbeta})} \sim F_{p-q,n-p},
$$
which is the \textit{test statistic for the $F$-test}.

\index{$p$-value}
Suppose we have the data set $(\bx_1, y_1), (\bx_2, y_2), \ldots, (\bx_n, y_n)$, and we observe the value $\rT=t$ based on this particular dataset. 
We can then compute the corresponding $p$-value as:
$$
\widetildep=\Pr(\rT((\bx_1, y_1), (\bx_2, y_2), \ldots, (\bx_n, y_n)) \geq t) = \Pr(F_{p-q,n-p} \geq t),
$$
We reject the null hypothesis $\mathcalH_0$ if $\widetildep<\alpha$, for some small $\alpha$, say $\alpha=0.05$.

\index{$F$-distribution}
\begin{SCfigure}
\centering
\includegraphics[width=0.5\textwidth]{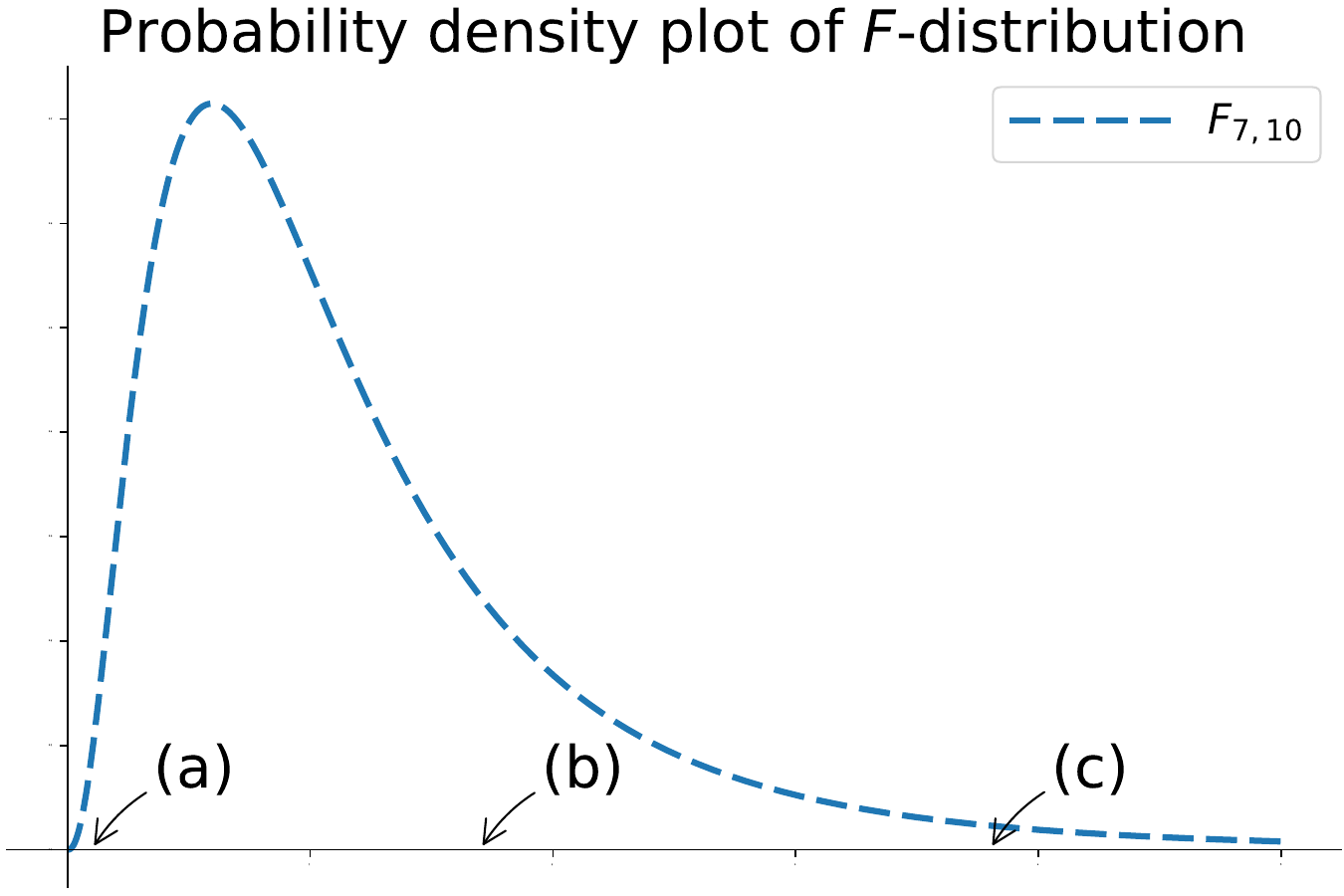}
\caption{An example of $F$-distribution. At point $(b)$, the submodel ($\bbeta_2=\bzero$) seems reasonable and the $p$-value is larger than 0.05 such that we cannot reject the hypothesis. At point $(c)$, the submodel is not reasonable and the $p$-value is smaller than 0.05 such that we reject the hypothesis. At point $(a)$, things are too good and the data may be preprocessed.}
\label{fig:f-dist-example}
\end{SCfigure}

Figure~\ref{fig:f-dist-example} illustrates an example of the $F$-distribution. At point $(b)$, the submodel ($\bbeta_2=\bzero$) seems reasonable and the $p$-value is larger than 0.05 such that we cannot reject the hypothesis. At point $(c)$, the submodel is not reasonable and the $p$-value is smaller than 0.05 such that we reject the hypothesis. At point $(a)$, the model fits too well, raising suspicion that the data may have been preprocessed or otherwise adjusted.

\paragrapharrow{F-test in practice.} 
In computational implementations, it is common to perform the reduced QR decomposition (Theorem~\ref{theorem:qr-decomposition-in-ls}) on the design matrices:
$$
\bX=\bQ\bR
\qquad \text{and}\qquad 
\bX_1 = \bQ_1\bR_1.
$$
Thus, $\RSS(\widehat{\bbeta}) = \rvy^\top (\bI-\bH)\rvy = \rvy^\top\rvy - (\rvy^\top\bQ)(\bQ^\top\rvy)$
and $\RSS(\widehat{\bbeta}_1) - \RSS(\widehat{\bbeta}) = \normtwo{\widehat{\rvy} - \widehat{\rvy}_1}^2 = \rvy^\top(\bH-\bH_1)\rvy=(\rvy^\top\bQ)(\bQ^\top\rvy)-(\rvy^\top\bQ_1)(\bQ_1^\top\rvy)$.
These expressions show that the difference between the residual sums of squares corresponds to the difference between two inner products, which simplifies computation in practice.


\index{$F$-test}
\index{Variable selection}
\subsection{Variable Selection Procedure}
In the linear regression model, we are often presented with a large number of potential predictor variables, although many of these may have no meaningful relationship with the response variable $\rvy$. A common method to evaluate the statistical significance of a variable involves setting $\bX_1=\bx_i$, where $\bx_i$ is one column of the design matrix $\bX$, and letting $\bX_2$ denote the remaining columns. This setup produces an associated $p$-value that reflects the variable's contribution to the model.

To identify a more parsimonious set of relevant predictors, we apply a \textit{variable selection procedure}---specifically, \textit{backward elimination}---as detailed in Algorithm~\ref{alg:variable-selection-ftest}. This process iteratively removes variables that contribute the least to the model fit, based on their $p$-values, until all remaining variables meet a specified significance threshold.

\begin{algorithm}[H] 
\caption{Variable Selection Procedure} 
\label{alg:variable-selection-ftest} 
\begin{algorithmic}[1] 
\Require 
Full column rank matrix $\bX \in \real^{n\times p}$;
\For{$i=1$ to $p$} 
\State $\bX_1$ contains a single column of $\bX$, $\bX_2$ contains the remaining columns;
\State \algoalign{Identify  the variable $i_{max}$ with the largest $p$-value  exceeding the cutoff value (e.g., 0.05);}
\State Remove $i_{max}$ from $\bX$;
\EndFor
\State Stop the procedure until all $p$-values are smaller than the cutoff.
\end{algorithmic} 
\end{algorithm}

\subsection{Variable Expansion  Procedure}
The variable selection procedure can be made the other way around---adding variables (i.e., \textit{forward selection}).
Let $\bone$, $\bX_1$, \ldots, $\bX_m$ represent groups of columns in the design matrix $\bX$ (referred to as the ``terms''), such that:
$$
\bX = [\underbrace{\bone}_{n \times 1} \underbrace{\bX_1}_{n \times p_1} \underbrace{\bX_2}_{n \times p_2} \ldots \underbrace{\bX_m}_{n \times p_m}]\in\real^{n\times p}, 
\quad 
\bbeta = [\underbrace{\beta_0}_{1 \times 1} \underbrace{\bbeta_1^\top}_{1 \times p_1} \underbrace{\bbeta_2^\top}_{1 \times p_2} \ldots \underbrace{\bbeta_m^\top}_{1 \times p_m}]^\top\in\real^p.
$$
In this context, the Gauss-Markov model becomes
$$
\rvy = \bX\bbeta + \bepsilon = \bone\beta_0 + \bX_1\bbeta_1 + \cdots + \bX_m\bbeta_m + \bepsilon.
$$
We aim to perform a similar ``F-test investigation", but now on a term-by-term basis. Specifically, we consider the following sequence of nested models:
\begin{itemize}
\item $\rvy = \bone\beta_0 + \bepsilon$.
\item $\rvy = \bone\beta_0 + \bX_1\bbeta_1 + \bepsilon$.
\item $\rvy = \bone\beta_0 + \bX_1\bbeta_1 + \bX_2\bbeta_2 + \bepsilon$.
\item $\vdots$
\item $\rvy = \bone\beta_0 + \bX_1\bbeta_1 + \bX_2\bbeta_2 + \cdots + \bX_m\bbeta_m + \bepsilon$.
\end{itemize}
To proceed, we define
$$
\mathcalX_0 \triangleq \bone
\qquad \text{and}\qquad 
\mathcalX_k \triangleq [\bX_0 , \bX_1, \bX_2 , \ldots , \bX_k], \quad \forall \, k \in \{0, \ldots, m\}.
$$
The corresponding projection matrix (hat matrix), predicted output, and the error vector become:
$$
\bH_k \triangleq\mathcalX_k(\mathcalX_k^\top \mathcalX_k)^{-1} \mathcalX_k^\top, 
\qquad
\widehat{\rvy}_k \triangleq \bH_k \rvy,
\qquad 
\rve_k = \rvy - \widehat{\rvy}_k, \qquad \forall\, k \in \{0, \ldots, m\}.
$$
Note that $\widehat{\rvy}_0 = \overline{\ry}\bone$, where  $\overline{\ry} = \frac{1}{n}\sum_{i=1}^{n} \ry_i$ represents the sample mean of the observed response values.
Similar to the argument in \eqref{equation:var_sec_two}, it follows that 
\begin{align*}
\underbrace{\normtwo{\rvy - \widehat{\rvy}_0}^2}_{\normtwo{\rve_0}^2} &= \underbrace{\normtwo{\rvy - \widehat{\rvy}_m}^2}_{\normtwo{\rve_m}^2} + \underbrace{\normtwo{\widehat{\rvy}_m - \widehat{\rvy}_{m-1}}^2}_{\normtwo{\rve_m - \rve_{m-1}}^2} + \cdots + \underbrace{\normtwo{\widehat{\rvy}_1 - \widehat{\rvy}_0}^2}_{\normtwo{\rve_1 - \rve_0}^2} 
= \underbrace{\normtwo{\rve_m}^2}_{\RSS_m} + \sum_{k=0}^{m-1} \underbrace{\normtwo{\rve_{k+1} - \rve_k}^2}_{\RSS_k - \RSS_{k+1}},
\end{align*}
where $\RSS_k$  denotes the residual sum of squares for the fitted model $\widehat{\rvy}_k$, with $\nu_k=n-1-\sum_{i}^{k}p_i$ degrees of freedom; see Remark~\ref{remark:df_of_a}.
According to Theorem~\ref{theorem:anova-model-selection}, these components can be interpreted as:
\begin{itemize}
\item $\RSS_k - \RSS_{k+1}$ is the reduction in residual sum of squares caused by adding the term $\bX_{k+1}$ to a model that already includes $\mathcalX_0,\bX_1, \ldots, \bX_k$.
\item $\RSS_m$ and $\{\RSS_k - \RSS_{k+1}\}_{k=0}^{m-1}$ are all mutually independent.
\item Since $\nu_k=n-1-\sum_{i}^{k}p_i$, we have $\nu_0 \geq \nu_1 \geq \nu_2 \geq \cdots \geq \nu_m$; and $\nu_{k+1} = \nu_k$ if $\bX_{k+1} \in \cspace(\mathcalX_k)$.
\end{itemize}

\begin{table}[H]
\centering
\resizebox{1.\textwidth}{!}{
\begin{tabular}{l c c | l c c}
\hline
Terms & df & Residual $\RSS$ & Terms added & df & Reduction in $\RSS$ \\
\hline
$\bone$ & $n-1$ & $\RSS_0$ & & &  \\
$\bone, \bX_1$ & $\nu_1$ & $\RSS_1$ & $\bX_1$ & $n-1-\nu_1$ & $\RSS_0 - \RSS_1$  \\
$\bone, \bX_1, \bX_2$ & $\nu_2$ & $\RSS_2$ & $\bX_2$ & $\nu_1 - \nu_2$ & $\RSS_1 - \RSS_2$ \\
$\vdots$ & $\vdots$ & $\vdots$ & $\vdots$ & $\vdots$ & $\vdots$  \\
$\bone, \bX_1, \ldots, \bX_m$ & $\nu_m$ & $\RSS_m$ & $\bX_m$ & $\nu_{m-1} - \nu_m$ & $\RSS_{m-1} - \RSS_m$  \\
\hline
\end{tabular}
}
\captionof{table}{ANOVA table for variable expansion procedure.}
\label{table:avo_var_expansion}
\end{table}

The $F$-statistic for testing the significance of the reduction in $\RSS$ when $\bX_k$ is added to the model containing the terms $\mathcalX_0, \bX_1, \ldots, \bX_k$ is given by:
$$
\rT_k = \frac{(\RSS_{k-1} - \RSS_k) / (\nu_{k-1} - \nu_k)}{\RSS_m / \nu_m},
$$
and $\rT_k \sim F_{\nu_{k-1}-\nu_k, \nu_m}$ under the null hypothesis $\mathcalH_0 : \bbeta_k = \bzero$.
Large values of $\rT_k$ relative to the null distribution are evidence against $\mathcalH_0$; see Figure~\ref{fig:f-dist-example}.

Using these results, the \textit{variable expansion procedure} considers adding each term in the model sequentially, using F-test to evaluate the significance of each addition in the reduction of the RSS; see Table~\ref{table:avo_var_expansion}.

\index{Model selection}
\section{Model Selection}

We previously introduced several metrics for evaluating a model's performance.
Unlike model evaluation, \textit{model selection} refers to the process of choosing the best model from a set of candidate statistical models based on a given dataset. Model selection can be applied to different types of models---such as logistic regression, neural networks, Gaussian process, etc.---as well as to models of the same type with varying hyper-parameters (e.g., ordinary least squares models with different numbers of predictor variables).

Before exploring various model selection methods and their appropriate use cases, it's important to understand the distinction between model selection and model evaluation:
\begin{itemize}
\item Model evaluation focuses on the model's performance during the training or fitting phase. It evaluates the fitting error of each candidate model to determine which one fits the training data best.
\item Model selection, on the other hand, aims to estimate the generalization error of the selected model---that is, how well the model performs on unseen data.
\end{itemize}

The key reason lies in the risk of \textit{overfitting}. A model may perform exceptionally well on the training data---for instance, a saturated model that perfectly fits every training sample---but fail to generalize to new data. Therefore, a good model should not only fit the training data well but also maintain strong performance on unseen data. Before deploying any model, we must ensure its performance remains stable when exposed to new inputs.

While training a model is relatively straightforward, selecting an appropriate model is often much more challenging. First, we need to move beyond the idea of a single ``best" model. Due to noise in the data, incomplete samples, and the limitations inherent to different modeling techniques, all models carry some degree of prediction error. As a result, the notion of a perfect or universally best model is not practical. Instead, we should aim to find a ``good enough" model.
A famous quote by \citet{box1987empirical} captures this ideal well: 
$$
\textit{``All models are wrong, but some are useful."}
$$

Different application scenarios come with different priorities when  choosing a final model. These might include: 
\begin{itemize}
\item \textit{Interpretability.} The ease with which stakeholders can understand and trust the model.
\item \textit{Model complexity.} Simpler models may be preferred in production environments.
\item \textit{Maintainability.} How easy it is to update or retrain the model over time.
\item \textit{Computational efficiency.} Some applications require fast inference or low memory usage.
\end{itemize}
In some contexts, a slightly less accurate but highly interpretable model may be favored. In others, raw performance may be prioritized at the expense of computational cost. Thus, what constitutes a ``good enough" model depends heavily on the specific goals and constraints of the project.

Roughly speaking, there are typically three main strategies used for model selection:
\begin{itemize}
\item \textit{Train, validation, and test sets.} Use a large dataset split into training, validation, and test sets to select the best-performing model.
\item \textit{Probabilistic measures.} Select models using a combination of sample error and model complexity.
\item \textit{Resampling methods.} Estimate out-of-sample error through repeated sampling techniques like cross-validation.
\end{itemize}
In an ideal scenario where sufficient data is available, the most reliable method is to split the data into three parts:
\begin{itemize}
\item \textit{Training set.} Fit the candidate models.
\item \textit{Validation set.} Tune hyper-parameters and select among competing models.
\item \textit{Test set.} Evaluate the final model's generalization performance using metrics such as accuracy or mean squared error.
\end{itemize}
However, this approach requires a substantial amount of data, which is often unavailable. In practice, especially when data is limited, probabilistic measures and resampling methods become more widely used.
Resampling methods are simple to implement even with small datasets. Since these two approaches are relatively intuitive, they don't require detailed explanation here. The rest of this section will focus on introducing commonly used probabilistic measure methods for model selection.

\subsection*{Probabilistic Measures}
Probabilistic measures evaluate candidate models based on two key factors: their performance on the training data and their model complexity. The concept of model complexity plays a central role in developing metrics that guide effective model selection.

It is well known that training error tends to be overly optimistic, making it an unreliable basis for model selection. This optimism can be addressed by penalizing model complexity, especially in algorithm-specific ways---often applicable to linear models. Several ``information criteria" have been proposed over time to address this issue by introducing penalty terms that counteract the bias toward more complex models.

According to \textit{Occam's Razor}, when multiple models offer similar predictive or explanatory power, the simplest one is often the best choice. Models with fewer parameters are generally less complex and tend to generalize better on average. Two widely used probabilistic model selection criteria include:
\begin{itemize}
\item  Akaike information criterion (AIC).
\item Bayesian information criterion (BIC).
\end{itemize}

When working with simple linear models---such as linear regression or logistic regression---probabilistic measures are particularly appropriate. In such cases, quantities like sample variance (used in computing model complexity) are well-defined and straightforward to calculate.

For example, both AIC and BIC penalize the number of model parameters while rewarding goodness of fit on the training set. Therefore, the best model is the one with the lowest AIC or BIC value. However, BIC imposes a stronger penalty on model complexity than AIC, which means it tends to favor simpler models---even if they perform slightly worse in terms of fit. Although these criteria allow model selection without using a validation set, they were originally derived under assumptions that apply strictly to parametric  linear models. Nonetheless, they are also commonly applied to broader settings, such as generalized linear models (e.g., Poisson or Gamma regression models; see Chapter~\ref{chapter:glm}).

\index{Akaike information criterion}
\index{AIC}
\subsection{AIC}
The \textit{Akaike information criterion (AIC)} is named after its creator, the statistician \textit{Hirotugu Akaike}.
Today, it serves as a foundational tool and is widely used in statistical inference.
AIC estimates the out-of-sample prediction error and measures the relative quality of statistical models fitted to a given dataset. It evaluates each model in comparison to others, making it a valuable method for model selection.

AIC is grounded in information theory. When we use statistical models to approximate the true data-generating process, they are rarely exact---some information is inevitably lost. AIC estimates this relative information loss, with better models losing less information. In doing so, AIC balances two key components: goodness of fit and model complexity. This balance helps prevent both overfitting (too complex models) and \textit{underfitting} (too simple models).

AIC can be used to compare both nested and non-nested models. It quantifies the information loss of each candidate model, aiming to identify the one that minimizes this loss. The formula for AIC is:
\begin{equation}
\AIC = 2p - 2\ell,
\end{equation}
where $\ell(\cdot)$ denotes the log-likelihood or measure of model fit, $p$ represents the model's complexity (the number of parameters in the model). 
It is important to note that the absolute value of AIC has no meaning; rather, it is the difference between AIC values across models that is informative. When comparing two models:
\begin{itemize}
\item If their fitting abilities (i.e., likelihoods) differ significantly, AIC is primarily influenced by the likelihood term.
\item If the fits are similar, AIC becomes more sensitive to the number of parameters $p$, which acts as a penalty for model complexity.
\end{itemize}
This penalty discourages overfitting by favoring simpler models unless a more complex model provides a substantially better fit.

AIC is particularly useful when comparing generalized linear models (GLMs, see Chapter~\ref{chapter:glm}) that share the same link and variance functions but differ in the set of predictor variables. When models are nested, the penalty term reflects the precision needed to remove unnecessary predictors from the model.

%



Several alternative versions of AIC have been proposed to improve its performance under certain conditions. Two notable variants include:
\begin{itemize}
\item \textit{Corrected AIC ($\AIC_{\text{c}}$).} Introduced by \citet{sugiura1978further} and later refined by \citet{hurvich1989regression}, this version adjusts for small sample sizes:
$$
\AIC_{\text{c}} = 2 \frac{p(p + 1)}{n - p - 1} + 2p - 2\ell,
$$
where, again,  $p$ is number of parameters in the model, and $n$ is number of observed samples.
\item \textit{Hannan-Quinn criterion ($\AIC_{\text{hq}}$).} Proposed by \citet{hannan1979determination}, this variant uses a slightly different penalty function:
$$
\AIC_{\text{hq}} = 2p \ln\{\ln(n)\} - 2\ell.
$$
\end{itemize}

\index{Bayesian information criterion}
\index{Schwarz criterion}
\index{BIC}
\subsection{BIC}\label{section:bic}

In statistics, the  \textit{Bayesian information criterion (BIC, a.k.a.  Schwarz criterion or Schwarz information criterion)} is a standard used to select models from a finite set of models. The model with the lowest BIC is preferred. 
It is partly based on the likelihood function and is closely related to AIC:
\begin{equation}
\BIC_\ell = p \ln(n) - 2\ell,
\end{equation}
where $p$ is the number of parameters in the model, $n$ is the sample size, and $\ell$ is the log-likelihood.
Unlike AIC, BIC includes a penalty term that becomes more severe as the sample size increases, making it more sensitive to model complexity as more data becomes available. This feature enhances its ability to identify meaningful patterns in the data.

To understand the theoretical foundation of BIC, we adopt the notation introduced in the Bayesian estimation section (see Section~\ref{section:bayes_esti}) and assume the hyper-parameter $\balpha$ on the prior; see \eqref{equation:baye_hyper}.
We rewrite the Laplace approximation of marginal likelihood with its complexity under data size in \eqref{equation:laplace_approx} as follows:
$$
\ln p(\mathcalX \mid \balpha)_{\text{Lap}}  = 
\underbrace{\ln p(\mathcalX \mid \widehat{\btheta} )}_{\mathcalO(n)} + \underbrace{\ln p(\widehat{\btheta} \mid \balpha)}_{\mathcalO(1)} + 
\underbrace{\frac{p}{2} \ln (2\pi) }_{\mathcalO(1)}
- 
\underbrace{\frac{1}{2}\ln \abs{\nabla^2 \ell(\widehat{\btheta} )}}_{\mathcalO(p\ln n)},
$$
where $\widehat{\btheta}$ denotes the selected model parameter.
The BIC score considers only the terms growing with data size $n$, and as the entries of the Hessian scale linearly with $n$, we approximate the marginal likelihood as \citep{schwarz1978estimating}:
$$
\begin{aligned}
\ln p(\mathcalX \mid \balpha)_{\text{Lap}} 
&\approx 
\ln p(\mathcalX \mid \widehat{\btheta} ) -\frac{1}{2}\abs{\nabla^2 \ell(\widehat{\btheta} )}
\overset{n\rightarrow \infty}{\approx}
\ln p(\mathcalX \mid \widehat{\btheta} ) - \lim_{n\rightarrow \infty }\frac{1}{2}\abs{\nabla^2 \ell(\widehat{\btheta} )}\\
&= \ln p(\mathcalX \mid \widehat{\btheta} ) -\frac{1}{2}\abs{n \bH_0} = \ln p(\mathcalX \mid \widehat{\btheta} ) - \frac{p}{2}\ln n - \underbrace{\frac{1}{2}\ln \abs{\bH_0}}_{\mathcalO(1)}. 
\end{aligned}
$$
Thus, the BIC score can be approximated as:
$$
\BIC_\ell
\approx
-2\ln p(\mathcalX \mid \balpha)_{\text{Lap}} 
\approx
p\ln n -2 \ln p(\mathcalX \mid \widehat{\btheta} ).
$$

The BIC offers several notable advantages.
Notably, BIC includes a penalty term that increases with the number of parameters ($p$) in the model (same as the AIC score), which helps to prevent overfitting by discouraging overly complex models.
BIC is straightforward to compute and interpret. It does not require any additional assumptions beyond those inherent in the models being compared.
Under certain regularity conditions, BIC is a consistent model selection criterion, meaning that as the sample size increases, it will select the true model with probability approaching one, provided the true model is included in the set of candidate models.
Unlike full Bayesian model selection, BIC does not require specifying prior distributions for the parameters, making it more accessible to users who are not familiar with Bayesian methods.
However, from a Bayesian perspective, the last feature might be seen as a drawback.

On the other hand, the BIC score does not take into account the local geometry of the parameter space. As a result, it is invariant to reparameterizations of the model. This invariance is desirable  since the BIC's insensitivity to how the parameters are expressed ensures that the criterion yields consistent results regardless of the parameterization chosen, aligning with the expectations of a rigorous Bayesian analysis where the posterior distribution should be invariant to reparameterization \citep{hoff2009first}. This property enhances the reliability and interpretability of the BIC when comparing different models, as it avoids bias introduced by arbitrary choices in parameter definitions \citep{beal2003variational}.

\index{Occam's razor}
\index{Occam factor}
\subsection{Occam's Razor and Occam Factor}\label{section:occam_razor}
In the context of BIC scores, we associate the complexity of a model with the number of parameters it has so as to prevent overfitting by discouraging overly complex models.
However, it is easy to come up with a model with numerous  parameters that can only represent a narrow variety of data sets; or conversely, to develop a model that can encompass a vast array of data sets using merely a single parameter.
In this scenario, it is wise to discard models that are too complex or too simple using marginal likelihood:
$$
p(\mathcalX \mid \balpha)= \int p(\btheta \mid \balpha)p(\mathcalX \mid \btheta) d\btheta,
$$
where we integrate out the parameters $\btheta$ and penalize models with more degrees of freedom, as such models have the capacity to fit a wide range of data sets a priori.
This property of Bayesian integration is known as \textit{Occam's razor}, which is the principle that states a preference for  simpler models for the data over complex ones \citep{mackay1995probable, beal2003variational}.

\begin{figure}[h!]
\centering
\includegraphics[width=0.65\textwidth]{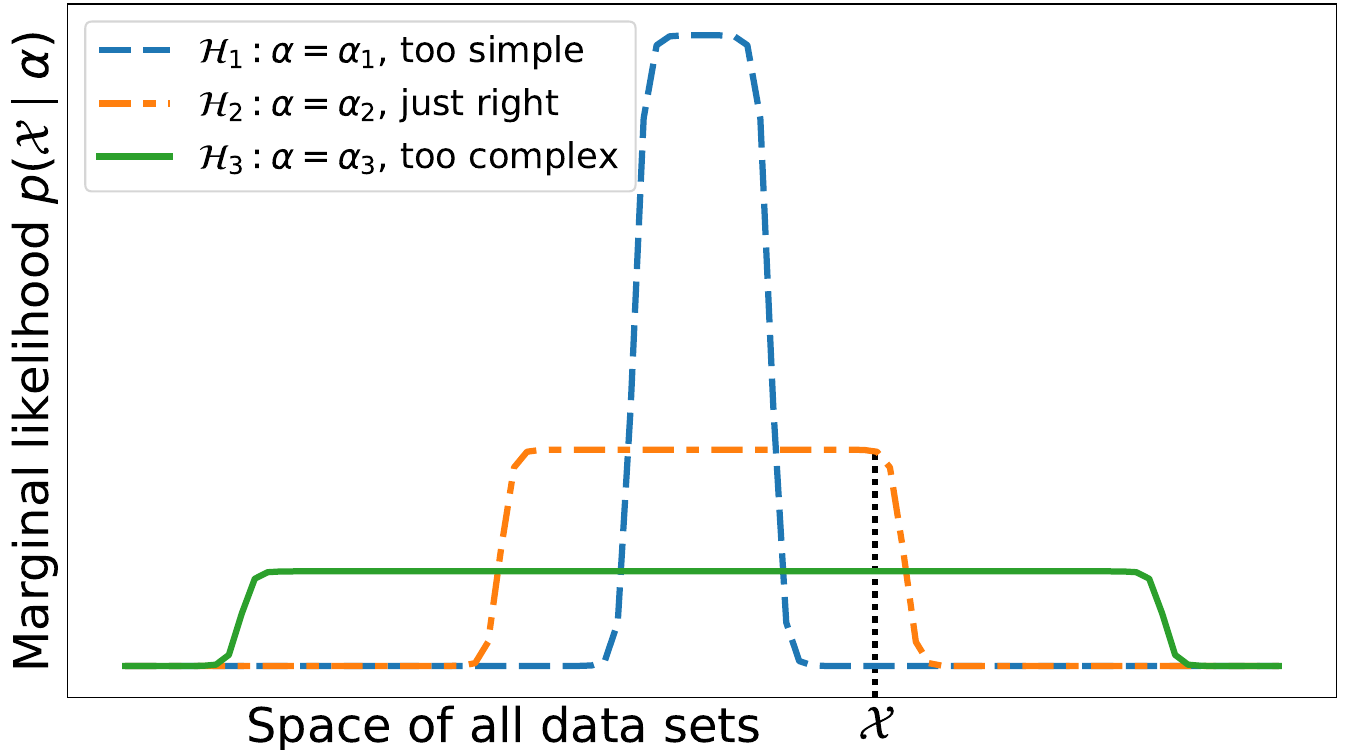}
\caption{Bayesian inference embodies Occam's razor. This figure provides the fundamental intuition for why more complex models tend to be less probable. The horizontal axis represents the space of all possible data sets, $\mathcalX$. 
According to Bayes' theorem, models are favored in proportion to how well they predicted the observed data. These predictions are represented by a marginal probability distribution over $\mathcalX$. 
A simple model  makes only a limited range of predictions; while a more powerful model is capable of predicting a greater variety of data sets.}
\label{fig:occam_razor}
\end{figure}

Occam's razor is shown in   Figure~\ref{fig:occam_razor}.
Since the probability of different data sets integrate to one over the marginal likelihood $p(\mathcalX \mid \balpha)$. If the the model is overly complex such that it can model a vast variety of data sets, the probability value for each data set can be reduced (the ``too complex" case in the figure with hypothesis \{$\mathcalH_3: \balpha=\balpha_3$\}). 
While the model is too simple, it might not cover the observed data set, rendering a small marginal probability (the ``too simple" case in the figure with hypothesis \{$\mathcalH_1: \balpha=\balpha_1$\}).

In Figure~\ref{fig:occam_razor}, the model hypothesis $\mathcalH_1$ is not compatible with the observed data set $\mathcalX$.
However, in the case where the data are compatible with both theories $\mathcalH_2$ and $\mathcalH_3$, the simpler model $\mathcalH_2$ will turn out to be more probable than the more complex model $\mathcalH_3$, without us having to express any subjective bias against complex models. Our subjective prior should simply assign equal probabilities to the possibilities of simplicity and complexity.
Therefore, given a data set $\mathcalX$, it is possible to discard both models that are too complex and those that are too simple, based on their marginal likelihood.

\begin{figure}[h!]
\centering
\includegraphics[width=0.65\textwidth]{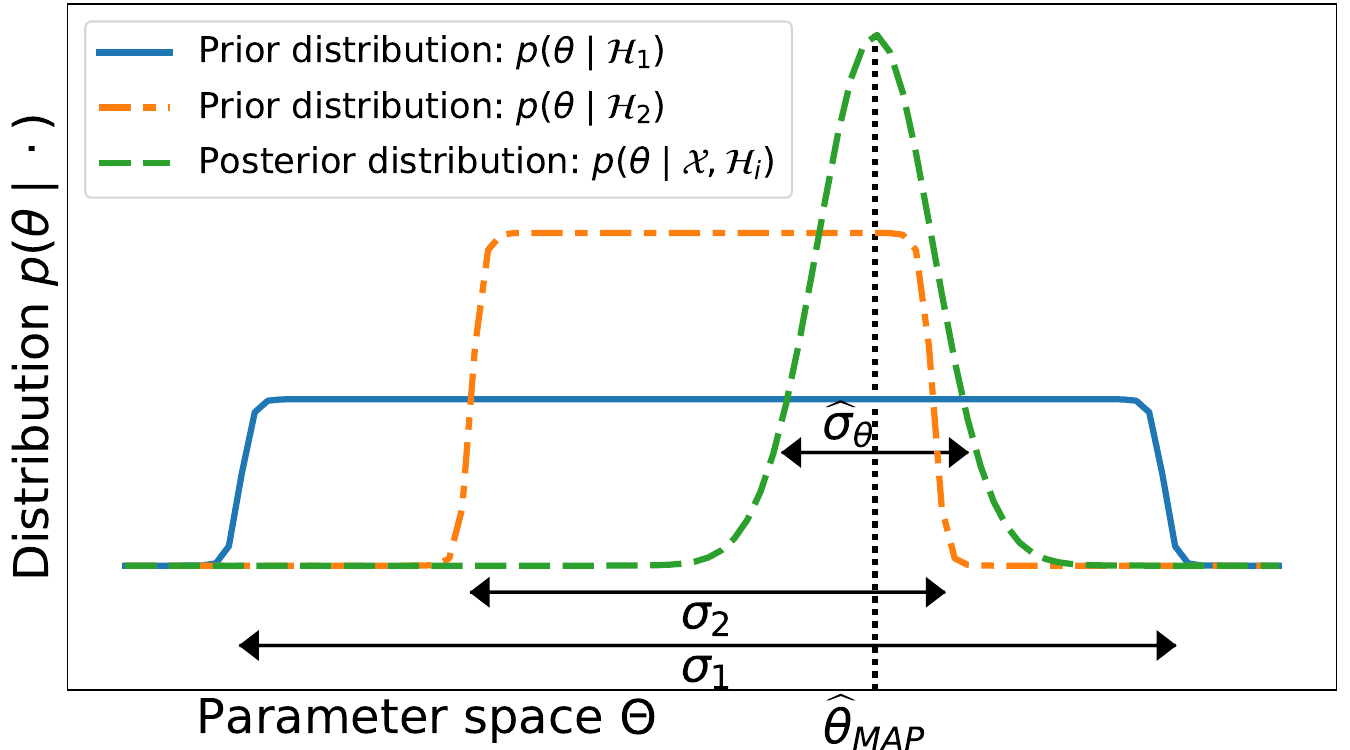}
\caption{Occam factor. The prior distribution $p(\btheta \mid \mathcalH_1)$ for the
	parameter has width $\sigma_{1}$, and the prior distribution $p(\btheta \mid \mathcalH_2)$ for the
	parameter has width $\sigma_{2}$. The posterior distribution  has a single peak at $\widehat{\btheta}_{\text{MAP}}$ with width $\widehat{\sigma}_{\btheta}$. }
\label{fig:occam_factor}
\end{figure}

As mentioned previously (see Section~\ref{section:bayes_esti}), the marginal likelihood or evidence is usually intractable or impossible to compute. Bayesian Occam's razor provides a way to approximate the marginal likelihood \citep{mackay1995probable}. As a recap, the marginal likelihood under a hypothesis $\{\mathcalH_1: \balpha=\balpha_1\}$ is 
$$
p(\mathcalX \mid \mathcalH_1)= \int p(\btheta \mid \mathcalH_1)p(\mathcalX \mid \btheta) d\btheta.
$$
For many problems, it is not uncommon that the posterior distribution $p(\btheta \mid \mathcalX, \mathcalH_1)= \frac{p(\mathcalX \mid \btheta)p(\btheta \mid \mathcalH_1)}{\text{marginal likelihood}}$ has a strong peak at the most probable parameter $\widehat{\btheta}_{\text{MAP}}$, i.e., the MAP estimate (see Figure~\ref{fig:occam_factor}).
Therefore, the marginal likelihood can be approximated by the height of the peak of the integrand $p(\btheta \mid \mathcalH_1)p(\mathcalX \mid \btheta)$ times its width, denoted by $\widehat{\sigma}_{\btheta}$ (see Figure~\ref{fig:occam_factor}):
$$
\underbrace{p(\mathcalX \mid \mathcalH_1)}_{\text{marginal likelihood}} \approx 
\underbrace{p(\mathcalX \mid \widehat{\btheta}_{\text{MAP}})}_{\text{MAP fit likelihood}} 
\underbrace{p(\widehat{\btheta}_{\text{MAP}} \mid \mathcalH_1) \cdot\widehat{\sigma}_{\btheta}}_{\text{Occam factor}},
$$
where $(\widehat{\btheta}_{\text{MAP}} \mid \mathcalH_1) \cdot \widehat{\sigma}_{\btheta}$ is defined as the \textit{Occam factor}~\footnote{When the posterior is approximated by a Gaussian, then the width is obtained by the determinant of the covariance matrix: $\widehat{\sigma}_{\btheta}=\det^{-1/2} \left(-\frac{1}{2\pi} \nabla^2 \ln p(\widehat{\btheta}_{\text{MAP}} \mid \mathcalX, \mathcalH_1)\right)$. See \citet{mackay1995probable} for more details.}.
The Occam factor is a value smaller than one  if $\widehat{\sigma}_{\btheta}< \sigma_{1}$, where the latter is the width of the prior distribution $p(\btheta \mid \mathcalH_1)$ (see Figure~\ref{fig:occam_factor}), and acts as a regularization that penalizes the parameter $\btheta$.

The width of the  posterior distribution signifies the uncertainty in parameter $\btheta$; while the width of the prior distribution represents the range of  values that were possible a priori.
Suppose the prior $p(\btheta \mid \mathcalH_1)$ is uniform. Then $p(\btheta \mid \mathcalH_1)=\frac{1}{\sigma_{1}}$, and the Occam factor is 
$$
\mathcalO_1=\frac{\widehat{\sigma}_{\btheta}}{\sigma_{1}},
$$
which measures the magnitude by which the hypothesis space collapses when the data arrive.
The model $\mathcalH_1$ can be viewed as consisting of a certain number of exclusive submodels, of which only one remains viable upon receiving the data. The Occam factor is the inverse of this number. The logarithm of the Occam factor measures the amount of information we gain about the model's parameters when the data become available \citep{mackay1995probable}.

Assume further  there is a hypothesis $\{\mathcalH_2:\balpha=\balpha_2\}$ with a smaller width $\sigma_2<\sigma_1$.
And assume the posterior distribution under $\mathcalH_2$ and $\mathcalH_1$ are the same: $p(\btheta \mid \mathcalX, \mathcalH_i)$ with the same width $\widehat{\sigma}_{\btheta}$ (this is a strong assumption for ease of evaluation; see Figure~\ref{fig:occam_factor}). 
The corresponding Occam factors has the following relationship:
$$
\mathcalO_1=\frac{\widehat{\sigma}_{\btheta}}{\sigma_{1}} < \mathcalO_2=\frac{\widehat{\sigma}_{\btheta}}{\sigma_{2}}.
$$
Apparently, model $\mathcalH_2$ is a stronger prior (more complex in a sense) than model $\mathcalH_1$ since the former imposes more a priori information in the assumption.
Therefore, the magnitude of the Occam factor serves as a measure of the model's complexity. This depends not only on the number of parameters within the model but also on the prior probability distribution the model assigns to those parameters.

\begin{problemset}
\item \label{prob:cook_b_bi} Prove \eqref{equation:cook_b_bi}. \textit{Hint: Denoting $\bX=\begin{bmatrixfoot}
\bX_1\\
\bx_i^\top\\
\bX_2 
\end{bmatrixfoot}$ and $\widetildebX =\begin{bmatrixfoot}
\bX_1\\
\bX_2 
\end{bmatrixfoot}$. Then $(\widetildebX^\top\widetildebX)^{-1} = (\bX^\top\bX)^{-1} + \frac{(\bX^\top\bX)^{-1}\bx_i\bx_i^\top (\bX^\top\bX)^{-1}}{1-h_{ii}}$ using the Sherman-Morrison formula \eqref{equation:she_morr_for}.}
\item Under the discussed model, suppose the probability density function for an $F$-test follows $F_{7,10}$, and with a given  threshold of $\alpha=0.05$, determine the critical value for the test statistic that rejects the hypothesis.
\end{problemset}

\newpage
\chapter{Large-Scale Least Squares Approximations}\label{chapter:large_scale}
\begingroup
\hypersetup{
	linkcolor=structurecolor,
	linktoc=page,  
}
\minitoc \newpage
\endgroup

\lettrine{\color{caligraphcolor}R}
Randomized methods help address large-scale optimization problems by offering faster and more efficient solutions to complex linear algebra tasks, enhancing the performance of iterative solvers through better preconditioning and enable scalable solutions for handling massive datasets.
These methods can significantly aid in solving large-scale optimization problems through several key mechanisms:
\begin{itemize}
\item \textit{Random sampling and projection.} Randomized algorithms often use randomness to perform sampling or projection operations on matrices. This involves selecting a small number of columns, rows, or elements from the matrix in a strategic manner to highlight important structural features. Alternatively, data can be projected into a lower-dimensional space while preserving key characteristics of the original dataset. The goal is to create a ``sketch" of the original data---similar in essential properties but simpler and faster to process.

\item \textit{Improved computational efficiency.} By using these sketches, randomized methods can significantly reduce the computational burden associated with solving large-scale linear algebra problems, such as matrix multiplication, least-squares regression, and low-rank matrix approximation. As a result, they make it feasible to solve problems that would otherwise be computationally infeasible due to their size.

\item \textit{Scalability.} Randomized methods are designed to scale well with the size of the input data, making them ideal for applications in machine learning, statistical data analysis, and other fields where datasets continue to grow. These techniques allow for the efficient processing of large datasets on both single machines and distributed systems without compromising accuracy.

\end{itemize} 

\noindent
In this chapter, we will primarily introduce randomized algorithms for solving large-scale least-squares problems.

\section{Sketched Least Squares}

\subsection{General Ideas and Subspace Embedding}
This subsection will not tend to be rigorous, which is the main goal of this book.
We will collect some important theorems in this section without  proof. 
This subsection is standalone, which aims to introduce the fundamental notions of subspace embedding, sketching technique, and its complexity in matrix multiplication. 
For further details, readers may refer to the references cited throughout the text.

\subsection*{Sketching Technique}
Before exploring large-scale least squares approximations, we briefly review the sketching technique used in approximate matrix multiplication.
To motivate this approach, we begin with a fundamental problem: approximating the product of two matrices. 
Suppose that we are given an  $n\times p$ matrix $\bX$ and a $p\times m$ matrix $\bY$, and our goal is to compute their product  $\bX\bY$:
\begin{itemize}
\item \textit{Three-loop perspective.} The most straightforward method for computing $\bX\bY$ is the classic three-loop algorithm. 
In this approach, each entry of the resulting matrix is viewed as the inner product between a row of $\bX$ and a column of $\bY$:
\begin{equation}
(\bX\bY)_{ij} = \innerproduct{\bx^{(i)}, \by_j}, \quad \forall\, i, j,
\end{equation}
where $\bx^{(i)}$ denotes the $i$-th row of $\bX$ (treated  as a column vector), and $\by_j$ represents the $j$-th column of $\bY$.

\item \textit{Column-row perspective.} An alternative way to understand the product $\bX\bY$ is as the sum of $p$ outer products, each of which is formed by a column of $\bX$ and the corresponding row of $\bY$:
\begin{equation}
\bX\bY = \sum_{i=1}^{p} \bx_i \by^{(i)\top}.
\end{equation}
\end{itemize}
From the column-row perspective, we can attempt to construct a simplified ``sketch" of the columns of $\bX$ and the rows of $\bY$.
These sketches are represented as matrices $\bC$ and $\bR$, respectively, and we approximate $\bX\bY$ using the product $\bC\bR$.

Such a sketch can be constructed using a $p\times k$ (with $k<p$) matrix $\bS$ (since we consider only linear sketches) such that  $\bC = \bX \bS\in\real^{n\times k}$ and $\bR = \bS^\top \bY\in\real^{k\times m}$.
Here, $\bX\bS$ performs a right-sketch on the columns of $\bX$, while $\bS^\top\bY$ applies a left-sketch to the rows of  $\bY$.
Our objective is to approximate $\bX\bY $ using6 $\bX\bY \approx \bC\bR = \bX \bS\bS^\top \bY$, where the quality of this approximation is typically evaluated by bounding the norm of the error matrix, i.e., by providing an upper bound for $\norm{\bX\bY - \bX \bS\bS^\top \bY}_\xi$, where $\xi$ represents some matrix norm such as the spectral norm, Frobenius norm, or trace norm. 

Of course, described this way, {the sketching matrix $ \bS $ can be anything---deterministic or randomized, efficient or intractable to compute, etc.}
However, it turns out that when $\bS$ is randomized---based on techniques like random sampling or random projections---we often achieve better performance compared to deterministic approaches.

The randomized sketches  fall into one of two categories:
\begin{itemize}
\item \textit{Random sampling sketches.} In this case, each column of the matrix $\bS$ contains exactly one nonzero entry, indicating which (rescaled) column of $\bX$ is selected.
\item  \textit{Random projection sketches.} 
That is, the matrix $ \bS $ is dense or nearly dense, consisting of i.i.d. random variables---often drawn from a distribution such as the standard Gaussian.
\end{itemize}
Random projections are considered \textit{data-agnostic}, meaning they can be constructed without observing the input data. In contrast, random sampling usually requires identifying and extracting important structural features from the data.

Basic  versions of both methods often perform reasonably well---they typically produce acceptable approximations---but they are not always faster than solving the original problem directly. Therefore, more sophisticated variants have been developed. For example, the sketching matrix $\bS$ might combine a Hadamard-like transformation with uniform sampling or a sparse projection matrix. In such cases, the quality of the projection or sampling remains close to that of the basic method, but the computation is significantly faster.

However, the main goal of this chapter is to illustrate the core ideas using Gaussian sketching. For details on other techniques, readers are encouraged to consult the references provided throughout the text.

\index{Subspace embedding}
\subsection*{Subspace Embedding}

We provide the basic concept  of an $\ell_2$-subspace embedding for the column space of an $n \times p$ matrix $\bX$. 
As we will see, this is a powerful tool for solving least squares regression problems. Throughout this section,  for nonnegative real numbers $x$ and $y$, we write $x \in (1 \pm \varepsilon)y$ to mean $x \in [(1 - \varepsilon)y, (1 + \varepsilon)y]$.

\begin{definition}[$(1 \pm \varepsilon)$ $\ell_2$-subspace embedding]\label{defi:subspace-embedding}
Let $\bX\in\real^{n\times p}$. Then, a \textit{$(1 \pm \varepsilon)$ $\ell_2$-subspace embedding} for the column space of   $\bX$ is a matrix $\bS$ satisfies
$$
\normtwo{\bS \bX \bbeta}^2 \in (1 \pm \varepsilon) \normtwo{\bX \bbeta}^2 ,\quad \forall\, \bbeta \in \real^p.
$$
\end{definition}
In what follows, we will often use shorthand and say that $\bS$ is an $\ell_2$-subspace embedding for $\bX$ itself, even though the definition depends only on the column space of $\bX$, not on the specific basis used to represent it.

Observe that if $\bS$ is a $(1 \pm \varepsilon)$ $\ell_2$-subspace embedding for $\bX$, then it is also a $(1 \pm \varepsilon)$ $\ell_2$-subspace embedding for $\bQ\in\real^{n\times r}$, where $\bQ$ is an orthonormal basis for the column space of $\bX$ and $\rank(\bX)=r$. 
This is because the sets $\{ \bX \bbeta \mid \bbeta \in \real^p \}$ and $\{ \bQ \balpha \mid \balpha \in \real^r \}$ are identical. 
Therefore, without loss of generality, we may assume that $\bX$ has orthonormal columns.
Under this assumption, the condition in Definition~\ref{defi:subspace-embedding} becomes:
\begin{equation}
\normtwo{ \bS \bQ \balpha}^2 = (1 \pm \varepsilon) \normtwo{\bQ \balpha}^2 = (1 \pm \varepsilon) \normtwo{\balpha}^2,
\end{equation}
where the last equality follows since $\bQ$ has orthonormal columns. 
If this requirement holds for all unit vectors $\balpha$,  then by linearity of $\bS$, it holds for all $\balpha\in\real^r$ (e.g., by scaling).
Thus, the requirement can be further simplified to:
\begin{equation}
\normtwo{\bI_r - \bQ^\top \bS^\top \bS \bQ} \leq \varepsilon.
\end{equation}

There are several goals in designing subspace embeddings. Two primary ones are:
\begin{itemize}
\item To find a matrix $\bS$ with as few rows as possible.
\item To ensure that the product $\bS\bX$ can be computed efficiently, since this is often a computational bottleneck in applications.
\end{itemize}

There are many ways to construct $\ell_2$-subspace embeddings, each offering different trade-offs between efficiency, accuracy, and other constraints. One particularly useful type is the oblivious $\ell_2$-subspace embedding.

\begin{definition}[$(\varepsilon, \delta)$ oblivious $\ell_2$-subspace embedding]\label{def:oblivious-subspace-embedding}
Let $P$ be a distribution on $r \times n$ matrices $\bS$, where $r$ is a function of $n, d, \varepsilon,$ and $\delta$. 
Suppose that with probability at least $1 - \delta$, for any fixed $n \times p$ matrix $\bX$, a matrix $\bS$ drawn from distribution $P$ satisfies the property that $\bS$ is a $(1 \pm \varepsilon)$ $\ell_2$-subspace embedding for $\bX$. Then we call $P$ an \textit{$(\varepsilon, \delta)$ oblivious $\ell_2$-subspace embedding}.
\end{definition}

\citet{sarlos2006improved} then  proposed using Fast Johnson-Lindenstrauss transforms to construct  subspace embeddings.

\begin{definition}[Johnson-Lindenstrauss transform \citep{sarlos2006improved}]
A random matrix $\rmS \in \real^{k \times n}$ is said to form a \textit{Johnson-Lindenstrauss (JL) transform} with parameters $\varepsilon, \delta, f$, or JLT$(\varepsilon, \delta, f)$ for short, if with probability at least $1-\delta$, for any $f$-element subset $\mathcalV \subset \real^n$, 
the following holds for all $\bv, \bv' \in \mathcalV$:
$$
\abs{\innerproduct{ \bS\bv, \bS\bv'} - \innerproduct{\bv, \bv'}} \leq \varepsilon \normtwo{\bv} \normtwo{\bv'}.
$$
\end{definition}

If we set $\bv = \bv'$, this condition reduces to the familiar statement that $\normtwo{\bS\bv}^2 \in (1 \pm \varepsilon) \normtwo{\bv}^2$. It turns out that if we scale all $\bv, \bv' \in \mathcalV$ so that they are unit vectors, an equivalent condition can be formulated using only norms: specifically, we could require that 
$$
\normtwo{\bS\bv}^2 \in (1 \pm \varepsilon) \normtwo{\bv}^2
\qquad\text{and}\qquad 
\normtwo{\bS(\bv + \bv')}^2 \in (1 \pm \varepsilon) \normtwo{\bv + \bv'}^2
$$ for all $\bv, \bv' \in \mathcalV$. In other words, the definition can be equivalently stated in terms of vector norms rather than inner products.

There are several known constructions of Johnson-Lindenstrauss transforms. One of the simplest is provided by the Gaussian sketching theorem, which states that if the number of rows $k$ satisfies $k = \Omega(\varepsilon^{-2} \ln(f/\delta))$~\footnote{
Big Omega (resp. Theta) notation is used to provide an asymptotic lower (resp. tight) bound on the growth rate of a function. If $ f(n) $ and $ g(n) $ are two functions defined on the set of positive integers, then we say $ f(n) = \Omega(g(n)) $ if there exist positive constants $ c $ and $ n_0 $ such that:
$$
f(n) \geq c \cdot g(n) \quad \text{for all} \quad n \geq n_0
$$
This means that $ f(n) $ grows at least as fast as $ g(n) $ for sufficiently large values of $ n $. 
And we say $ f(n) = \Theta(g(n)) $ if there exist positive constants $ c_1 $, $ c_2 $, and $ n_0 $ such that:
$$
c_1 \cdot g(n) \leq f(n) \leq c_2 \cdot g(n) \quad \text{for all} \quad n \geq n_0
$$
This means that $ f(n) $ grows at the same rate as $ g(n) $ for sufficiently large values of $ n $. In other words, $ f(n) $ is both upper-bounded and lower-bounded by $ g(n) $ within constant factors.
}, 
then the resulting Gaussian matrix yields a valid JL transform.
\begin{theoremHigh}[Gaussian sketching \citep{indyk1998approximate, woodruff2014sketching}]\label{thm:gau_jlt}
Let $0 < \varepsilon, \delta < 1$, and let $\rmS = \frac{1}{\sqrt{k}} \rmG \in \real^{k \times n}$, where the entries $\rg_{ij}$ of $\rmG$ are independent standard normal random variables. Then, if $k = \Omega(\varepsilon^{-2} \ln(f/\delta))$, the matrix $\rmS$ is a JLT$(\varepsilon, \delta, f)$.
\end{theoremHigh}

\begin{theoremHigh}[Gaussian sketching \citep{woodruff2014sketching}]\label{theorem:gauss_ske_jl}
Let $0 < \varepsilon, \delta < 1$, and let $\rmS = \frac{1}{\sqrt{k}} \rmG \in \real^{k \times n}$, where the entries $\rg_{ij}$ are independent standard normal random variables. Then, if $k = \Theta((d + \ln(1/\delta))\varepsilon^{-2})$, then for any fixed $n \times p$ matrix $\bX$, with probability at least $1 - \delta$, $\rmS$ is a $(1 \pm \varepsilon)$ $\ell_2$-subspace embedding for $\bX$; that is, simultaneously for all $\bbeta \in \real^p$, it holds that $\normtwo{\rmS\bX\bbeta} \in (1 \pm \varepsilon) \normtwo{\bX\bbeta}$. 
\end{theoremHigh}

It turns out that Theorem~\ref{theorem:gauss_ske_jl} provides an optimal number of rows of $\rmS$ up to a constant factor---specifically, $\Theta(k\varepsilon^{-2})$. 

After Theorem~\ref{thm:gau_jlt} was introduced, several improvements followed. For example, \cite{achlioptas2003database} showed that one can replace the Gaussian matrix $\rmG$ in Theorem~\ref{thm:gau_jlt} with a matrix whose entries are i.i.d. sign random variables \cite{achlioptas2003database}; that is, each entry independently takes the value  $1$ or $-1$ with equal probability. Furthermore, he showed that the distribution can be modified so that each entry of $\rmG$ is set to $1$ with probability $1/6$,  $-1$ with probability $1/6$, and 0 with probability $2/3$. This modification results in a sparse matrix $\rmS$, which allows faster computation of the product $\rmS \cdot \bbeta$ for any vector $\bbeta \in \real^n$.

A significant advancement came from \citet{dasgupta2010sparse}, who showed that it suffices for each column of $\rmS$ to have only $\varepsilon^{-1}\text{poly}(\ln(f/\delta))$ nonzero entries. If the $\text{poly}(f/\delta)$ term is small compared to $\varepsilon^{-1}$, this represents a substantial improvement over earlier constructions, which required $\Omega(\varepsilon^{-2}\ln(f/\delta))$ nonzero entries per column. Later, \citet{kane2014sparser} improved this to $\mathcalO(\varepsilon^{-1}\ln(f/\delta))$ nonzero entries per column. This result was shown to be nearly tight by \citet{nelson2013sparsity}, who proved that at least $\Omega(\varepsilon^{-1}\ln(f/\delta)/\ln(1/\varepsilon))$  column sparsity is necessary.

In summary, this line of work shows that applying a JLT$(\varepsilon, \delta, f)$ matrix $\rmS$ to a vector $\bbeta$ can be done in time $\mathcalO(\nnz(\bbeta) \cdot \varepsilon^{-1}\ln(f/\delta))$, where $\nnz(\bbeta)$ denotes the number of nonzero entries in $\bbeta$. This leads to a significant speedup over Theorem~\ref{thm:gau_jlt} when $\varepsilon$ is small. It also improves upon Theorem~\ref{theorem:gauss_ske_jl}, although even better results are possible in the context of $\ell_2$-subspace embeddings \citep{woodruff2014sketching}.

Another approach aimed at speeding up the construction in Theorem~\ref{thm:gau_jlt} was proposed by \citet{ailon2006approximate}. Instead of focusing on sparsity, they sought to design matrices $\rmS$ that can be applied to vectors $\bbeta$ very efficiently. The key idea is that for a vector $\bbeta\in \real^n$ whose $\ell_2$ mass is approximately uniformly distributed across its coordinates, sampling a small number of coordinates uniformly at random and rescaling gives a good estimate of $\ell_2$-norm of $\bbeta$. However, if $\bbeta$ is sparse or has unevenly distributed mass, uniform sampling performs poorly, as most samples may be zero.

\index{Gaussian sketching}
\subsection{Gaussian Left Sketching}
We now discuss in more detail how Gaussian sketching can be applied to the least squares (LS) problem.

Let $\bX\in\real^{n\times p}$ with $n>p$.
Now we examine a particular formulation of $ \bS\in\real^{m\times n} $: the \textit{Gaussian left sketch}, or simply the \textit{Gaussian sketch}. That is, let $ \rmS $~\footnote{Note again that $\rmS$ (resp. $\rs$) denotes a random matrix (resp. random variable), and $\rmS=\bS$ (resp. $\rs=s$) denotes a realization of the random matrix (resp. random variable).} have i.i.d. Gaussian entries; specifically, each entry $ \rs_{ij} \sim \frac{1}{\sqrt{m}} \normal(0, 1) $ so that 
\begin{equation}\label{equation:mean_gauslef_i}
\Exp[\rmS^\top \rmS] = \bI 
\end{equation}
(since diagonals follow from a Chi-squared distribution; Definition~\ref{definition:chisquare_dist}). Recall that the LS and sketched LS solutions are given, respectively, by
$$
\widehat{\bbeta } = \argmin_{\bbeta \in \real^p} \normtwo{\bX \bbeta - \by}^2
\qquad \text{and}\qquad 
\widetildebbeta = \argmin_{\bbeta \in \real^p} \normtwo{\rmS \bX \bbeta - \rmS \by}^2.
$$
Note that the sources of randomness in these two estimators are different. The sampling distribution of $\widetildebbeta$ arises from the randomness in the sketching matrix $\rmS$, which induces variability through the sketched data $\rmS\bX$ and $\rmS\by$. In contrast, in Chapter~\ref{sec:lr-gaussian-noise}, the sampling distribution of $\widehatbbeta$ stems from the stochasticity in the response vector $\by$, typically due to additive noise in the model.

In other words, while $\widehatbbeta$ varies across different realizations of the noise in the data, $\widetildebbeta$ varies across different realizations of the random projection matrix $\rmS$, assuming the original data $(\bX,\by)$ is fixed.

\paragrapharrow{Mean under Gaussian sketching.}
Since $ \rmS $ is a random matrix, an important question is whether $ \Exp[\widetildebbeta] $ is equal to $ \widehatbbeta $, meaning that the expected value of the sketched solution equals the true least squares solution.
Assuming that $ \bX^\top \rmS^\top \rmS \bX $ is nonsingular (which will hold with high probability when $ m \gg p $), we can express the sketched solution as:
$$
\widetildebbeta = (\bX^\top \rmS^\top \rmS \bX)^{-1} \bX^\top \rmS^\top \rmS \by.
$$
Now, decompose $\by$ as $ \by = \bX \widehatbbeta + \by^\perp $, where $ \by^\perp \perp \cspace(\bX) $~\footnote{$\by^\perp$ is equivalent to the error vector $\be=\by-\bX\widehatbbeta$ we defined previously. Or equivalently $\by^\perp \in \nspace(\bX^\top)$, i.e., lies in the left null space; see the paragraph below Definition~\ref{definition:ortho_comp_col}. See Section~\ref{section:ls_fund_la} for more details.}. 
Substituting this decomposition into the expression for $\widetildebbeta$, we obtain:
\begin{equation}\label{equation:lef_gaus_ls_bb_dec}
\begin{aligned}
	\widetildebbeta 
	&= (\bX^\top \rmS^\top \rmS \bX)^{-1} \bX^\top \rmS^\top \rmS (\bX \widehatbbeta + \by^\perp)\\
	&= \widehatbbeta + (\bX^\top \rmS^\top \rmS \bX)^{-1} \bX^\top \rmS^\top \rmS \by^\perp.
\end{aligned}
\end{equation}
Then for $ \Exp[\widetildebbeta] = \widehatbbeta $ to hold, the term
$
\Exp[(\bX^\top \rmS^\top \rmS \bX)^{-1} \bX^\top \rmS^\top \rmS \by^\perp]
$
must vanish. To see that this will happen, note that $ \rmS \bX $ and $ \rmS \by^\perp $ are uncorrelated ($\bX^\top\by^\perp = \bzero$), which then implies independence. Then we can rewrite the expectation
\begin{equation}
\begin{aligned}
\Exp[\widetildebbeta] 
&= \widehatbbeta+\Exp_{\rmS \bX}[(\bX^\top \rmS^\top \rmS \bX)^{-1} \bX^\top \rmS^\top ] \cdot \Exp_{\rmS\by^\perp}[\rmS \by^\perp]\\
&= \widehatbbeta+\Exp_{\rmS \bX}[(\bX^\top \rmS^\top \rmS \bX)^{-1} \bX^\top \rmS^\top ] \cdot \bzero 
= \widehatbbeta.
\end{aligned}
\end{equation}
Thus, the expectation of the randomly projected solution matches the true least squares solution under Gaussian sketching.

\index{Sampling distribution}
\paragrapharrow{Variance under Gaussian sketching.}
We also analyze the variances of:
$$
\widetildebbeta
\qquad \text{and} \qquad 
\Exp\left[\normtwobig{\bX \widetildebbeta - \bX \widehatbbeta}^2\right].
$$
One key observation is that the variance tends to be smaller when the objective value $ f(\widehatbbeta) \triangleq \normtwobig{\bX \widehatbbeta - \by}^2 $ is small. This is analogous to having low variance when the training loss is small in machine learning models.

Another thing to note is that throughout, we assume that $ \bX^\top \bS^\top \bS \bX $ and $\bX^\top\bX$ are nonsingular. 
In fact, if the eigenvalues of this matrix are small, i.e., $ \bX^\top \bS^\top \bS \bX $ is nearly singular, then the variance will increase accordingly. However, we have some control over this via the construction of $ \bS $.

To analyze the variance, we begin by conditioning on on $\bS \bX$ to derive a conditional distribution, and then relax this assumption to obtain a full characterization.

\paragraph{Fixing $\rmS\bX=\bS\bX$.} Note that by fixing $\bS \bX$, the only source of randomness in $\widetildebbeta$ comes from $\rmS\by^\perp$. 
We can express this as:
$$
\rmS\by^\perp 
=
\begin{bmatrix}
	\sum_j \rs_{1j} y_j^\perp \\
	\sum_j \rs_{2j} y_j^\perp \\
	\vdots\\
	\sum_j \rs_{mj} y_j^\perp 
\end{bmatrix}.
$$
Since $\rs_{ij} \sim \frac{1}{\sqrt{m}}\normal(0,1)$, it follows that 
$$
\Var\Big[\sum_j \rs_{ij} y_j^\perp\Big] = \sum_j \left(y_j^\perp\right)^2 \cdot \frac{1}{m}  
= \normtwo{\by^\perp}^2 \cdot \frac{1}{m}
= \frac{1}{m}\normtwo{\by - \bX\widehatbbeta}  
\triangleq \frac{1}{m} f(\widehatbbeta) .
$$
Therefore, we have that $\rmS\by^\perp \sim \normal\left(\bzero, \frac{f(\widehatbbeta)}{m} \bI\right)$. And by \eqref{equation:lef_gaus_ls_bb_dec} and Lemma~\ref{lemma:affine_mult_gauss}, it follows that:
\begin{equation}\label{equation:leftgauss_tildebbe}
\widetildebbeta \sim \normal\left(\widehatbbeta, \frac{f(\widehatbbeta)}{m} (\bX^\top \bS^\top \bS \bX)^{-1}\right),
\qquad \text{when fixing }\bS\bX,
\end{equation}
from which it follows that
\begin{equation}\label{equation:dist_xbb_fixsx}
\bX (\widetildebbeta - \widehatbbeta) \sim \normal\left(\bzero, \frac{f(\widehatbbeta)}{m} \bX \left(\bX^\top \bS^\top \bS \bX\right)^{-1} \bX^\top\right),
\qquad \text{when fixing }\bS\bX.
\end{equation}

\paragraph{Random $\rmS\bX$.} 
Now suppose  $\rmS \bX$ is no longer fixed. 
Although $\Exp[\bX^\top \rmS^\top \rmS \bX]$ is an unbiased estimator of $\bX^\top \bX$, 
the estimate $\Exp[(\bX^\top \rmS^\top \rmS \bX)^{-1}]$ introduces bias when estimating $(\bX^\top \bX)^{-1}$, with  a factor $\frac{m}{m-p-1}$. 
That is, for $m > p + 1$, we have
\begin{equation}\label{equation:inv_xssx}
\Exp\left[\left(\bX^\top \rmS^\top \rmS \bX\right)^{-1}\right] = \left(\bX^\top \bX\right)^{-1} \frac{m}{m-p-1}.
\end{equation}
Therefore, if $m = p - 1$ for instance, the variance in \eqref{equation:dist_xbb_fixsx} will blow up.

\index{Singular value decomposition}
\begin{proof}[of \eqref{equation:inv_xssx}]
Let $\rmZ\triangleq \rmS\bX$ with rows $\rvz_1, \rvz_2, \ldots, \rvz_m\sim \normal (\bzero, \frac{1}{m}\bX^\top\bX)$ (Lemma~\ref{lemma:affine_mult_gauss}).
Therefore, $\bZ^\top\bZ\sim \wishartdist(\frac{1}{m}\bX^\top\bX, m)$ and thus $(\bZ^\top\bZ)^{-1}\sim\inversewishart\big((\frac{1}{m}\bX^\top\bX)^{-1}, m\big)$ (Definitions~\ref{definition:wishart_dist} and \ref{definition:multi_inverse_wishart}). This completes the proof.
\end{proof}

Note that for $\rvz \sim \normal(\bzero, \bZ)$, since $\trace(\bZ) = \trace(\Exp[\rvz\rvz^\top])$, it follows that
$
\Exp[\normtwo{\rvz}^2] = \Exp[\trace(\rvz\rvz^\top)] = \Exp[\trace(\bZ)]
$, 
whence we have 
$$
\small
\begin{aligned}
\Exp \normtwo{\bX (\widetildebbeta - \widehatbbeta)}^2 
= \Exp\left[\frac{f(\widehatbbeta)}{m} \trace \left(\bX \left(\bX^\top \rmS^\top \rmS \bX\right)^{-1} \bX^\top\right)\right]
= \frac{f(\widehatbbeta)}{m-p-1} \trace \big(\bX \left(\bX^\top \bX\right)^{-1} \bX^\top\big).
\end{aligned}
$$~\footnote{By $\Exp\normtwo{\cdot}^2$ we mean the expectation of $\normtwo{\cdot}^2$ rather than the square of $\Exp[\normtwo{\cdot}]$.}
To see the specific value of $\trace \big(\bX \left(\bX^\top \bX\right)^{-1} \bX^\top\big)$,
notice that $ \bX \left( \bX^\top \bX \right)^{-1} \bX^\top = \bX \bX^+ $ projects onto the column space of $ \bX $ (Lemma~\ref{lemma:column-space-of-projection}). Let $\bX$ admit the reduced SVD $ \bX = \bU \bSigma \bV^\top $ (Figure~\ref{fig:svd-comparison}).
Since we assume $\bX^\top\bX$ is nonsingular (i.e., $\rank(\bX)=p$ when $n>p$),
we have 
$$
\begin{aligned}
\trace (\bX \big( \bX^\top \bX \big)^{-1} \bX^\top )
&= \trace (\bU \bU^\top )
\stackrel{\dag}{=} \trace (\bU^\top \bU )
= \trace (\bI_{p})
= p
= \rank(\bX),
\end{aligned}
$$
where the equality ($\dag$) follows from the cyclic invariance of traces.
So we conclude that
\begin{equation}\label{equation:vari_xbtilde_bhat}
\Exp \normtwo{\bX (\widetildebbeta - \widehatbbeta)}^2 = f(\widehatbbeta) \frac{p}{m - p - 1}.
\end{equation}
where $ f(\widehatbbeta) \frac{m}{m - p - 1} $ provides insight into how to choose $m$ when constructing $\bS$ in order to achieve a desired expected error.

\begin{exercise}\label{exercise:left_gauss_xtildebminusy}
Let $\bX\in\real^{n\times p}$ with full column rank $p$.
Using the results in \eqref{equation:leftgauss_tildebbe}, \eqref{equation:dist_xbb_fixsx}, and \eqref{equation:vari_xbtilde_bhat} to show that 
$$
\Exp\normtwo{\bX\widetildebbeta - \by}^2 = f(\widehatbbeta)\frac{m-1}{m - p - 1}.
$$
When $\rank(\bX)=r<p$, show that 
$$
\Exp\normtwo{\bX\widetildebbeta - \by}^2 = f(\widehatbbeta)\frac{m-1}{m - r - 1}.
$$
\end{exercise}

\index{Singular value decomposition}
\subsection{Special Sketching Matrices}\label{section:speci_lef_ske}

We observed that when using random sketching, certain conditions must be satisfied to guarantee approximate optimality (e.g., recall that the Gaussian sketch requires $ \bX^\top \bS^\top \bS \bX $ to be invertible). We can also explore deterministic constructions for the sketching matrix $ \bS $. 
Let $ \bX = \bU \bSigma \bV^\top $ be the reduced SVD of $ \bX $, where $\bU\in\real^{n\times p},\bSigma\in\real^{p\times p}$, and $\bV\in\real^{p\times p}$ if $\bX$ has full rank with $n>p$.
In this context, two notable choices for the sketching matrix $\bS$ arise:
\paragrapharrow{Option 1: $ \bS = \bU^\top $.}
Suppose we choose $ \bS = \bU^\top $, i.e., the matrix containing the left singular vectors in its rows as our sketching matrix.
In this case, we have
$$
\begin{aligned}
\widetildebbeta 
&= (\bS \bX)^+ \bS \by 
= (\bU^\top \bU \bSigma \bV^\top)^+ \bS \by
= (\bSigma \bV^\top)^+ \bS \by \\
&= \bV \bSigma^{-1} \bS \by
= \bV \bSigma^{-1} \bU^\top \by 
= \bX^+ \by
= \widehatbbeta.
\end{aligned}
$$
Thus, by choosing $ \bS = \bU^\top $, we exactly recover the least squares solution. However, this approach requires computing the left singular vectors of $\bX$, which takes $\mathcalO(n p^2) $ time. As a result, there is no computational advantage compared to solving the original (unsketched) least squares problem.

\paragrapharrow{Option 2: $ \bS = \bX^\top $.}
Alternatively, suppose we choose $ \bS = \bX^\top $. Then we obtain:
$$
\begin{aligned}
\widetildebbeta 
&= (\bS \bX)^+ \bS \by
= (\bX^\top \bX)^+ \bX^\top \by 
= \bV \bSigma^{-2} \bV^\top \bV \bSigma \bU^\top \by \\
&= \bV \bSigma^{-1} \bU^\top \by 
= \widehatbbeta.
\end{aligned}
$$
Again, this choice leads to an exact recovery of the least squares solution. However, as in the previous case, it requires computing the pseudo-inverse of  $ \bX^\top \bX $, which also takes  $\mathcalO(n p^2) $ time. Consequently, this method again offers no computational savings compared to the standard least squares solution.

\index{Variance reduction}
\subsection{Variance Reduction by Averaging}

In the previous sections, we showed that the average deviation of the left-sketched least squares solution from the original solution is proportional to $ f(\widehatbbeta) $.
Since this deviation corresponds to the variance of the random variable $ \bX(\widetildebbeta-\widehatbbeta) $, we can reduce this variance—and thus improve our estimate of $ f(\widetildebbeta) $---by averaging over multiple i.i.d. instances of $\widetildebbeta$.

Let $ \rmS_1,\rmS_2, \ldots, \rmS_q $ be independent sketching matrices, each of which has entries drawn independently from a scaled normal distribution $ \frac{1}{\sqrt{m}} \normal(0, 1) $, such that $\Exp[\rmS_i^\top \rmS_i] = \bI$, $\forall\, i\in\{1,2,\ldots,q\}$. We define each sketched estimator $\widetildebbeta_i$ as follows:
$$
\widetildebbeta_i = \argmin_{\bbeta \in \real^p} \normtwo{\bS_i \bX \bbeta - \bS_i \by}^2 ,
\qquad \forall\, i\in\{1,2,\ldots,q\}.
$$
Now define the averaged estimator $\overline{\bbeta}$ as
$
\overline{\bbeta} \triangleq \frac{1}{q} \sum_{i=1}^q \widetildebbeta_i.
$
This estimator $\overline{\bbeta}$ is unbiased because
$$
\Exp[\overline{\bbeta}] = \Exp \left[ \frac{1}{q} \sum_{i=1}^q \widetildebbeta_i \right] = \frac{1}{q} \sum_{i=1}^q \Exp[\widetildebbeta_i] = \frac{1}{q} \sum_{i=1}^q \widehatbbeta = \widehatbbeta.
$$
Additionally, using the fact that the $\widetildebbeta_i$'s are independent, the variance is reduced by a factor $\frac{1}{q}$. Specifically:
\begin{equation}
\Exp[f(\overline{\bbeta}) - f(\widehatbbeta)] 
= \Exp \normtwo{\bX (\overline{\bbeta} - \widehatbbeta)}^2 
= \frac{1}{q} f(\widehatbbeta) \frac{p}{m - p - 1}.
\end{equation}

The computational complexity of this averaging algorithm becomes $\mathcalO(q\cdot m p^2)$.
On the other hand, when employing the non-averaging algorithm with $\bS\in\real^{qm\times n}$, the computational complexity requires $\mathcalO(q\cdot m p^2)$, and the expected error becomes:
\begin{equation}
	\Exp[f(\widetildebbeta) - f(\widehatbbeta)] 
	= \Exp \normtwo{\bX (\widetildebbeta - \widehatbbeta)}^2 
	=  f(\widehatbbeta) \frac{p}{qm - p - 1}.
\end{equation}
Since $f(\widehatbbeta) \frac{p}{qm - p - 1} < \frac{1}{q} f(\widehatbbeta) \frac{p}{m - p - 1}$, the non-averaged method achieves better accuracy at the same computational cost. Therefore, the averaging algorithm is rarely used in practice.
However, in parallel or distributed computing environments, or on devices with limited memory, the averaging approach may still offer practical advantages due to its modular and lightweight nature.

\subsection{Sketched Matrix Least Squares}\label{section:sketc_mat_ls}

More generally, 
we consider the matrix least squares (a.k.a., \textit{multiple-response least squares}) problem, which is formulated as:
\begin{equation}\label{equation:matrix_ls}
	[\widehatbbeta_1,\widehatbbeta_2, \ldots, \widehatbbeta_q ]\triangleq \widehatbB = \argmin_{\bB \in \real^{p \times q}} \normf{{\bX}\bB - \bY}^2,
\end{equation}
where $\bX \in \real^{n\times p}$ and $\bY \in \real^{n\times q}$. 
The (left-) sketched variant of \eqref{equation:matrix_ls}    is given by,
\begin{equation}\label{equation:matrix_ls_ske}
	[\widetildebbeta_1,\widetildebbeta_2, \ldots, \widetildebbeta_q ]\triangleq	\widetilde{\bB} 
	= \argmin_{\bB \in \real^{p \times q}} \normf{\bS \bX \bB - \bS\bY}^2,
\end{equation}
where $\bS \in \real^{m \times n}$ is a (Gaussian) sketching matrix. 
The solution of \eqref{equation:matrix_ls_ske} can be obtained in closed form as 
$\widetilde{\bB} = (\bS\bX)^+ \bS\bY$ (Theorem~\ref{theorem:svd-deficient-rank}).

When $q=1$, this formulation reduces to the standard sketched least squares problem.
When $q>1$,
the original problem and sketched variant are
$$
\widehatbB \triangleq \argmin_{\bB} \normf{\bX\bB - \bY}^2 
\qquad \text{and} \qquad 
\widetildebB \triangleq \argmin_{\bB} \normf{\bS \bX\bB - \bS\bY}^2.
$$
The $i$-th column of $\widetildebB$ satisfies
$
\widetildebbeta_i = \argmin_{\bbeta_i} \normtwo{\bS \bX\bbeta_i - \bS \by_i}^2.
$
For a Gaussian sketching matrix $\bS$, we have
$ \Exp \normtwobig{\bX(\widetildebbeta_i - \widehatbbeta_i)}^2 = \normtwobig{\bX\widehatbbeta_i - \by_i}^2 \frac{p}{m-p-1}$, 
which implies
$$
\Exp \normf{\bX(\widetildebB - \widehatbbeta)}^2 = \sum_{i=1}^q \normtwo{\bX\widehatbbeta_i - \by_i}^2 \frac{p}{m-p-1}
= \normf{\bX\widehatbbeta - \bY}^2 \frac{p}{m-p-1}.
$$
Suppose that $\rank(\bX) = r$. By Exercise~\ref{exercise:left_gauss_xtildebminusy}, we then have 
\begin{align}
	\Exp \normf{\bX(\widetildebB - \widehatbB)}^2 
	&= \normf{\bX\widehatbB - \bY}^2 \frac{r}{m-r-1};\\
	\Exp \normf{\bX\widetildebB - \bY}^2 
	&= \normf{\bX\widehatbB - \bY}^2 \frac{m-1}{m-r-1}.\label{equation:mls_lske_mainres2}
\end{align}
This result is referred to as the \textit{left sketching optimality gap} under the Gaussian sketch \citep{pilanci29lecture, halko2011finding}.

\index{High-dimensional LS}
\section{Sketched High-Dimensional Least Squares}

We now consider the problem of solving the high-dimensional linear system $\bX\bbeta = \by$, where $\bX \in \real^{n \times p}$ and $p > n$ (Section~\ref{section:ls-via-svd}). In general, such a system does not have a unique solution because there are more unknowns than equations. However, among all possible solutions, the minimum ($\ell_2$) norm solution is typically unique and well-defined; see Section~\ref{section:ls_conv}. 
The minimum-norm solution is defined as follows:
$$
\widehatbbeta_{\text{mn}} = \argmin_{\bX\bbeta=\by} \normtwo{\bbeta}^2
=
 \bX^+\by= \bX^\top (\bX\bX^\top)^{-1} \by.
$$
Similar to the least squares case, we can reduce the dimensionality of the problem by multiplying $\bX$ on the right by a random projection matrix $\bS \in \real^{p \times m}$  to form $\bX\bS$, where  $p>m$ (in most scenarios, we only consider $p>m>n$), and solve
\begin{equation}
\argmin_{\bX\bS \balpha=\by} \normtwo{\balpha}^2. 
\end{equation}

By right-multiplying $\bX$ by $\bS$, we change the dimension of the optimization variable. Specifically, the vector being minimized---$\balpha$---now has a smaller dimension than $\bbeta$. To address this, we use the relation $\bbeta=\bS\balpha$, and hope that $\bS\balpha$ provides a good approximation to the original solution $\bbeta$. As we will see, this is often the case.

Let
$ \widetilde{\balpha} \triangleq \argmin_{\bX\bS \balpha=\by} \normtwo{\balpha}^2$
and use the approximation that $\widetildebbeta \triangleq  \bS\widetilde{\balpha}$. A solution to $\widetilde{\balpha}$ is $(\bX\bS)^+\by$ (see Section~\ref{section:ls-via-svd}). Substituting into the constraint equation, when $\bX\bS$ has full row rank, we have that 
$$
\begin{aligned}
\bX\widetildebbeta 
&= \bX\bS\widetilde{\balpha}
= \bX\bS(\bX\bS)^+\by
= \by.
\end{aligned}
$$
This shows that $\widetildebbeta = \bS\widetilde{\balpha}$ is a valid solution satisfying the original constraint $\bX\bbeta = \by$, provided that $\bX\bS$ has full row rank.
This also shows that the error vector 
$$
\widetildebbeta - \widehatbbeta_{\text{mn}} \in \nspace(\bX).
$$

\index{Gaussian sketching}
\index{Minimum-norm solution}
\index{Singular value decomposition}
\subsection{Gaussian Right Sketching}
Let $\rmS\in\real^{p\times m}$ with entries  $\rs_{ij} \overset{\text{iid}}{\sim}\frac{1}{\sqrt{m}} \normal(0, 1)$ be the right sketching matrix, and let  $\widetilde{\balpha}$ denote the solution to the sketched minimum-norm problem. Our estimate for $\widehatbbeta_{\text{mn}}$, defined as $\widetildebbeta = \bS\widetilde{\balpha}$, possesses several important properties.
Similarly, we begin by assuming that $\bX\bS$ is a fixed matrix.
\begin{lemma}[Distribution of $\widetildebbeta$ under fixed $\bX\bS$]\label{lemma:mean_hidd_ls_gaussske}
Let $\bX\in\real^{n\times p}$ have full row rank with $n<p$. For a fixed $\bX\bS\in \real^{n\times m}$ of full row rank, 
it follows that 
$$
\widetildebbeta \sim \normal\left(\widehatbbeta_{\text{mn}}, \frac{1}{m} \by^\top (\bX \bS \bS^\top \bX^\top)^{-1} \by\bI\right).
$$
That is, $\widetildebbeta$ is an unbiased estimator of $\widehatbbeta_{\text{mn}}$, i.e., $\Exp[\widetildebbeta] = \widehatbbeta_{\text{mn}}$.
\end{lemma}
\begin{proof}[of Lemma~\ref{lemma:mean_hidd_ls_gaussske}]
Let $\bX=\bU\bSigma\bV_1^\top$ be the reduced SVD of $\bX$, where $\bU,\bSigma\in\real^{n\times n}$, and $\bV_1\in\real^{p\times n}$.
Consider further the full right singular vector matrix $[\bV_1, \bV_2]\in\real^{p\times p}$, where $\bV_1 \in \real^{p\times n}$ and $\bV_2 \in \real^{p\times (p-n)}$. Left-multiplying $\widetildebbeta$ by $\bV_1^\top$ yields that 
$$
\begin{aligned}
\bV_1^\top \widetildebbeta 
&=\bV_1^\top \bS \widetilde{\balpha}
=\bV_1^\top \bS (\bX\bS)^+ \by 
=\bV_1^\top \bS \bS^\top \bX^\top (\bX\bS\bS^\top \bX^\top)^{-1} \by \\
&=\bV_1^\top \bS \bS^\top \bV_1 \bSigma \bU^\top (\bU \bSigma \bV_1^\top \bS \bS^\top \bV_1 \bSigma \bU^\top)^{-1} \by \\
&=\bV_1^\top \bS \bS^\top\bV_1 \bSigma \bU^\top \bU \bSigma^{-1} (\bV_1^\top \bS \bS^\top\bV_1)^{-1} \bSigma^{-1} \bU^\top \by  \\
&=\bV_1^\top \bS \bS^\top \bV_1 (\bV_1^\top \bS \bS^\top\bV_1)^{-1} \bSigma^{-1} \bU^\top \by \\
&= \bSigma^{-1} \bU^\top \by
=\bV_1^\top \bV_1 \bSigma^{-1} \bU^\top \by 
=\bV_1^\top \widehatbbeta_{\text{mn}}.
\end{aligned}
$$
Similarly, left-multiplying $\widetildebbeta$ by $\bV_2^\top$ yields that 
$$
\begin{aligned}
\bV_2^\top \widetildebbeta 
&=\bV_2^\top \bS \bS^\top \bV_1 (\bV_1^\top \bS \bS^\top\bV_1)^{-1} \bSigma^{-1} \bU^\top \by \\
&=\bV_2^\top \bS \bS^\top \bV_1 (\bV_1^\top \bS \bS^\top\bV_1)^{-1}\bV_1^\top \bV_1 \bSigma^{-1} \bU^\top \by \\
&=\bV_2^\top \bS \bS^\top \bV_1 (\bV_1^\top \bS \bS^\top\bV_1)^{-1}\bV_1^\top \widehatbbeta_{\text{mn}}.
\end{aligned}
$$
Taking the expectation of $\bV_2^{\top} \widetildebbeta$:
$$
\begin{aligned}
	\Exp[\bV_2^\top \widetildebbeta] 
	&= \Exp[\bV_2^\top \bS \bS^\top \bV_1 (\bV_1^\top \bS \bS^\top\bV_1)^{-1}\bV_1^\top \widehatbbeta_{\text{mn}}] \\
	&= \Exp[\bV_2^\top \bS ]  \cdot \bS^\top \bV_1 (\bV_1^\top \bS \bS^\top\bV_1)^{-1}\bV_1^\top \widehatbbeta_{\text{mn}} 
	= \bzero,
\end{aligned}
$$
where the penultimate equality follows since $\bS^\top\bV_2$ and $\bS^\top\bV_1$ are uncorrelated, and   $\Exp[\bV_2^\top \bS ]=\bzero$ follows since $\rs_{ij} \overset{\text{iid}}{\sim} \frac{1}{\sqrt{m}}\normal(0, 1)$.
Now consider the expectation of the orthogonal matrix-vector product $[\bV_1, \bV_2]^\top \widetildebbeta$, we get that
$$
\Exp \left[[\bV_1, \bV_2]^\top \widetildebbeta\right]
= \begin{bmatrix}
	\Exp[\bV_1^\top \widetildebbeta] \\
	\Exp[\bV_2^\top \widetildebbeta]
\end{bmatrix}
= \begin{bmatrix}
	\bV_1^\top \widehatbbeta_{\text{mn}} \\
	\bzero
\end{bmatrix}
\equiv 
[\bV_1, \bV_2]^\top\widehatbbeta_{\text{mn}}.
$$
where the last equality follows because $\bV_2^\top \widehatbbeta_{\text{mn}} =  \bV_2^\top \bX^+\by= \bV_2^\top\bX^\top (\bX\bX^\top)^{-1} \by=\bzero$.
Because the expectation of $\widetildebbeta$ multiplied by an orthogonal matrix is equal to $\widehatbbeta_{\text{mn}}$ multiplied by the same orthogonal matrix, we can conclude that $\Exp[\widetildebbeta] = \widehatbbeta_{\text{mn}}$.

For the variance, let $\rvs_i$ be the $i$-th row of the random matrix $\rmS$. 
Since we fix $\bX\bS$  and $\widetildebbeta =\rmS \widetilde{\balpha} 
=\bS (\bX\bS)^+\by = \rmS (\bX\bS)^\top (\bX\bS\bS^\top\bX^\top)^{-1}\by$, it follows from Lemma~\ref{lemma:affine_mult_gauss} that 
$$
\widetilde{\beta}_i = \rvs_i^\top (\bX\bS)^\top (\bX\bS\bS^\top\bX^\top)^{-1}\by
\quad\implies\quad 
\Var[\widetilde{\beta}_i] = \frac{1}{m} \by^\top (\bX \bS \bS^\top \bX^\top)^{-1} \by.
$$
Therefore, the covariance matrix of $\widetildebbeta$ is given by $\Cov[\widetildebbeta] = \frac{1}{m} \by^\top (\bX \bS \bS^\top \bX^\top)^{-1} \by\bI$.
\end{proof}

In Lemma~\ref{lemma:mean_hidd_ls_gaussske}, we assume that $\bX\bS$ is fixed rather than a random matrix. When $\bX\rmS$ is instead a random matrix, as opposed to the variance of $\bX (\widetildebbeta - \widehatbbeta)$ given in \eqref{equation:vari_xbtilde_bhat}, we can derive the variance of $\widetildebbeta - \widehatbbeta_{\text{mn}}$.
\begin{theoremHigh}[Variance of $\widetildebbeta - \widehatbbeta_{\text{mn}}$]\label{theorem:var_wideb_bmn}
Let $\bX\in\real^{n\times p}$ have full row rank with $n<p$. For a random $\bX\rmS\in \real^{n\times m}$ with full row rank, 
it follows that 
$$
\Exp\normtwo{\widetildebbeta - \widehatbbeta_{\text{mn}}}^2 = \frac{p}{m-n-1} \normtwo{\widehatbbeta_{\text{mn}}}^2.
$$
\end{theoremHigh}
\begin{proof}[of Theorem~\ref{theorem:var_wideb_bmn}]
Let $\bX=\bU\bSigma\bV_1^\top$ be the reduced SVD of $\bX$, where $\bU,\bSigma\in\real^{n\times n}$, and $\bV_1\in\real^{p\times n}$.
Note that for $\rvz \sim \normal(\bzero, \bZ)$, since $\trace(\bZ) = \trace(\Exp[\rvz\rvz^\top])$, it follows that
$
\Exp[\normtwo{\rvz}^2] = \Exp[\trace(\rvz\rvz^\top)] = \Exp[\trace(\bZ)]
$.
We then have 
$$
\begin{aligned}
\Exp\normtwobig{\widetildebbeta - \widehatbbeta_{\text{mn}}}^2 
&= \Exp\big[(\widetildebbeta - \widehatbbeta_{\text{mn}})^\top (\widetildebbeta - \widehatbbeta_{\text{mn}})\big] 
= \Exp\big[\trace (\widetildebbeta - \widehatbbeta_{\text{mn}})(\widetildebbeta - \widehatbbeta_{\text{mn}})^\top\big] \\
&= \Exp\left[\trace \left(\frac{1}{m}\by^\top (\bX\bS\bS^\top \bX^\top)^{-1} \by\cdot \bI\right)\right]
= \frac{1}{m} \by^\top \Exp[(\bX\bS\bS^\top \bX^\top)^{-1}] \by \cdot \trace (\bI) \\
&\stackrel{\dag}{=} \frac{1}{m} \frac{m}{m-n-1} \cdot p (\by^\top (\bX\bX^\top)^{-1} \by) 
= \frac{p}{m-n-1} \by^\top (\bX\bX^\top)^{-1} \by \\
&= \frac{p}{m-n-1} \by^\top \bU \bSigma^{-2} \bU^\top \by 
= \frac{p}{m-n-1} (\bV_1 \bSigma^{-1} \bU^\top \by)^\top \bV_1 \bSigma^{-1} \bU^\top \by \\
&= \frac{p}{m-n-1} \normtwo{\widehatbbeta_{\text{mn}}}^2,
\end{aligned}
$$
where the last equality follows from $\widehatbbeta_{\text{mn}}=\bX^+\by= \bX^\top (\bX\bX^\top)^{-1} \by = \bV_1 \bSigma^{-1} \bU^\top \by$, and the equality ($\dag$) follows from the  fact that $\Exp[(\bX\bS\bS^\top \bX^\top)^{-1}] = (\bX\bX^\top)^{-1} \frac{m}{m-n-1}$ (the proof of which is similar to that of \eqref{equation:inv_xssx}).
This completes the proof.
\end{proof}

\subsection{Special Sketching Matrices}

So far, we have assumed that $ \rmS $ is an i.i.d. Gaussian matrix in order to simplify the theoretical analysis. However, many other types of sketching matrices are possible. 
Recall that $ \bX $ can be written as $ \bX = \bU \bSigma \bV_1^\top $ in its reduced SVD form, where $\bU\in\real^{n\times n},\bSigma\in\real^{n\times n}$, and $\bV_1\in\real^{p\times n}$ if $\bX$ has full row rank with $p>n$, and that the minimum-norm solution to $ \bX\bbeta = \by $ is given by $ \widehatbbeta_{\text{mn}} =\bX^+\by=\bX^\top (\bX\bX^\top)^{-1} \by $. 
Similar to the left sketching technique described in Section~\ref{section:speci_lef_ske}, the following deterministic choices for $\bS$ are worth considering:

\paragrapharrow{Option 1: $ \bS = \bV_1 \in\real^{p\times n} $.}
The minimum-norm solution using this sketching matrix corresponds to the left pseudo-inverse of $  \bX \bS$, i.e.,
$$
\begin{aligned}
\widetildebbeta &=  \bS\widetilde{\balpha}=  \bS (\bX\bS)^+\by= \bV_1^\top (\bU\bSigma\bV_1^\top\bV_1)^{-1}\by = \bV_1^\top\bSigma^{-1}\bU^\top\by = \bX^+\by=\widehatbbeta.
\end{aligned}
$$
\paragrapharrow{Option 2: $ \bS = \bX^\top $.}
The minimum-norm solution for this sketching matrix is:
$$
\begin{aligned}
\widetildebbeta &= \bS\widetilde{\balpha}= \bS(\bX\bS)^+\by =\bX^\top (\bX\bX^\top)^{-1}\by = \widehatbbeta.
\end{aligned}
$$

These choices of $\bS$ yield exact solutions to the minimum-norm LS problem. However, they are equivalent to classical direct methods for solving the same problem, which contradicts the original purpose of sketching---namely, to approximately solve the optimization problem more efficiently using a JL embedding or similar dimensionality reduction technique.

\section{Sketched Least Squares with Quantized Response}\label{section:ske_qu_ls}

In many applications, the dimension $q$ of the matrix least squares problem introduced in Section~\ref{section:sketc_mat_ls} can be extremely large, making it difficult to store the response matrix $\bY$, especially on devices with limited memory or computational resources. One possible solution is to store a quantized version of $\bY$. Fortunately, it turns out that the least squares solution obtained using Gaussian sketching with quantized responses still concentrates around the true solution; that is, the corresponding residual matrix is upper bounded \citep{saha2023matrix}.

\index{Quantized response}
\index{Uniformly dithered quantizer}
\index{Quantization}
\paragrapharrow{Uniformly dithered quantizer.}
We begin by considering the quantization of a scalar $\beta$ satisfying $\abs{\beta} \leq R$. Given a bit-budget of $B$ bits, the scalar quantizer with dynamic range $R$ is defined by first setting the $M \triangleq 2^B$ quantization levels as:
$$
q_1 = -R,\quad q_2 = -R + \Delta,\quad q_3 = -R + 2\Delta,\quad \ldots, \quad q_M = -R + (M-1)\Delta,
$$
where the \textit{resolution} is given by $\Delta = \frac{2R}{M-1}$.
The operation of the \textit{uniformly dithered quantizer} is then defined as:
\begin{equation}\label{equation:dither_quantizer_prob}
Q_{R,B}(\beta) = 
\begin{cases} 
q_{k+1}, & \text{with probability } r \triangleq \frac{\beta - q_k}{\Delta}, \\
q_k, & \text{with probability } 1-r,
\end{cases}
\end{equation}
where $k = \arg \max_j \{q_j \leq \beta\}$, meaning $\beta \in [q_k, q_{k+1}]$. 
If the input $\beta$ to the quantizer lies outside this interval, i.e., $\beta > R$ or $\beta < -R$, the quantizer is said to be \textit{saturated}. 

Finally, to quantize any matrix $\bB=\{\beta_{ij}\}$, we obtain $Q_{R,B}(\bB)$ by quantizing each entry independently, i.e., $[Q_{R,B}(\bB)]_{ij} \triangleq Q_{R,B}(\beta_{ij})$.
Then, we have the following result about the unbiasedness of the uniformly dithered quantizer.
\begin{lemma}[Uniformly dithered quantizer]\label{lemma:unif_dit_qua}
Consider the quantization of a scalar  $\beta \in [-R, +R]$, and denote the quantization error of uniformly dithered scalar quantizer with a bit-budget of $B$ bits as $\epsilon \triangleq Q_{R,B}(\beta) - \beta$.
Clearly, the quantization error is bounded, satisfying $\abs{\epsilon} \leq \Delta$.
Then it follows that 
$$
\Exp[\epsilon] = 0 \qquad \text{and} \qquad \Var[\epsilon] \leq \frac{\Delta^2}{4} =\frac{R^2}{(2^B - 1)^2},
$$
where the expectation $\Exp[\cdot]$ is taken over the randomness introduced by the dithering in the quantizer operation.
Hence, the uniformly dithered quantizer is unbiased, and the variance of the quantization error depends only on $R$ and $B$.
\end{lemma}
\begin{proof}[of Lemma~\ref{lemma:unif_dit_qua}]
Suppose $\beta \in [q_k, q_{k+1}]$, where $q_{k+1} = q_k + \Delta$ and $\Delta = \frac{2R}{2^B - 1}$. Then,
$$
\Exp[Q_{R,B}(\beta)] = q_{k+1} \frac{\beta - q_k}{\Delta} + q_k \left(1 - \frac{\beta - q_k}{\Delta}\right) 
= \frac{(q_k + \Delta)(\beta - q_k) + q_k (\Delta - \beta + q_k)}{\Delta} = \beta.
$$
Now consider the variance:
$$
\begin{aligned}
\Var[\epsilon]=\Exp[(Q_{R,B}(\beta) - \beta)^2 ]
&= (q_{k+1} - \beta)^2 \left(\frac{\beta - q_k}{\Delta}\right) + (q_k - \beta)^2 \left(\frac{q_{k+1}-\beta}{\Delta}\right)\\
&= (q_{k+1} - \beta)(\beta - q_k)
\leq \max_{\beta \in [q_k, q_{k+1}]} (q_{k+1} - \beta)(\beta - q_k)\\
&= \left(q_{k+1} - \frac{q_k + q_{k+1}}{2}\right) \left(\frac{q_k + q_{k+1}}{2} - q_k\right) = \frac{\Delta^2}{4} = \frac{R^2}{(2^B - 1)^2}.
\end{aligned}
$$
This completes the proof.
\end{proof}

\paragrapharrow{Sketched least squares with quantized response.}
The sketched variant of the matrix least squares problem \eqref{equation:matrix_ls} with quantized response is given by,
\begin{equation}\label{equation:matrix_ls_ske_quand}
\widetilde{\rmB} = \argmin_{\bB \in \real^{p \times q}} \normf{\rmS\bX\bB - Q(\rmS\bY)}^2,
\end{equation}
where $\rmS \in \real^{m \times n}$ is a Gaussian sketching matrix with entries distributed as $\rs_{ij} \sim \normal(0, \frac{1}{m})$, and $Q \triangleq Q_{R, B}$ denotes the uniformly dithered quantizer defined in \eqref{equation:dither_quantizer_prob}. We assume that the dynamic range satisfies $R \geq \norm{\rmS\bY}_+$ so that the quantizer remains unsaturated. 
The solution to \eqref{equation:matrix_ls_ske_quand} can be written in closed form (Theorem~\ref{theorem:svd-deficient-rank}) as
\begin{equation}
	\widetilde{\rmB} = (\rmS\bX)^+ Q(\rmS\bY).
\end{equation}
Note that the norm $\norm{\cdot}_+$ denotes the maximum magnitude of the underlying matrix; see Problem~\ref{prob:maxmag_norm}.

The following theorem characterizes the accuracy of the approximate solution $\widetilde{\rmB}$ relative to the original least squares problem \eqref{equation:matrix_ls}.
\begin{theoremHigh}[Sketched LS with quantized response]\label{theorem:quantized_res_ls}
Let $\rmS \in \real^{m \times n}$ be a random Gaussian matrix with entries distributed as $\rs_{ij} \sim \normal(0, \frac{1}{m})$, and $Q \triangleq Q_{R, B}$ be a uniformly dithered quantizer with dynamic range $R$ and bit-budget $B$. Furthermore, suppose we are given matrices  $\bX \in \real^{n\times p}$ and $\bY \in \real^{n\times q}$, and define:
$$
\widehatbB \triangleq \argmin_{\bB \in \real^{p \times q}} \normf{\bX\bB - \bY}^2 
\qquad \text{and} \qquad 
\widetilde{\rmB} \triangleq \argmin_{\bB \in \real^{p \times q}} \normf{\rmS\bX\bB - Q(\rmS\bY)}^2.
$$
Let $\mathbf{E} \triangleq Q(\rmS\bY) - \rmS\bY$ be the quantization error matrix.
Then, if $R \geq \norm{\rmS\bY}_+$ and $\bS\bX$ has full column rank, we have
$$
\normf{\bX\widehatbB - \bY}^2 \leq \Exp \normf{\bX\widetilde{\rmB} - \bY}^2 \leq \frac{m-1}{m-r-1} \normf{\bX\widehatbB - \bY}^2 + \frac{q \Delta^2}{4} \frac{\sigma_{\max}^2}{\sigma_{\min}^2} \frac{m^2}{(n - m - 1)},
$$
where $r = \operatorname{rank}(\bX)$, and $\sigma_{\max}$ and $\sigma_{\min}$ denote the largest and smallest singular values of $\bX$, respectively.
\end{theoremHigh}
\begin{proof}[of Theorem~\ref{theorem:quantized_res_ls}]
The solution of the standard matrix least squares problem \eqref{equation:matrix_ls} can be expressed as:
$$
\widehatbB = [\widehatbbeta_1, \widehatbbeta_2, \ldots, \widehatbbeta_q], 
\quad 
\text{where} \quad \widehatbbeta_i = \argmin_{\bbeta \in \real^p} \normf{\bX\bbeta - \by_i}^2, \ \forall\, i,
$$
and $\by_i \in \real^n$ denote the $i$-th column of $\bY$. Therefore, we first analyze the vector case for the standard least squares problem:
\begin{equation}\label{equation:quant_ols}
\widehatbbeta = \argmin_{\bbeta \in \real^p} \normf{\bX\bbeta - \by}^2,
\end{equation}
and generalize the results by concatenating $\widehatbbeta_i$ to obtain $\widehatbB$. The sketched variant of \eqref{equation:quant_ols} with quantized response is given by,
\begin{equation}\label{equation:quant_sketcls}
\widetildebbeta = \argmin_{\bbeta \in \real^p} \normf{\rmS \bX\bbeta - Q(\rmS\by)}^2.
\end{equation}
The solution to \eqref{equation:quant_sketcls} is available in closed form as $\widetildebbeta = (\rmS\bX)^+ Q(\rmS\by)$ (Theorem~\ref{theorem:svd-deficient-rank}). 
Let $\be \triangleq Q(\rmS\by) - \rmS\by \in \real^m$   denote the quantization error. 
We then have,
\begin{equation}\label{equation:mls_quan_onecol}
\begin{aligned}
&\Exp \normtwobig{\bX\widetildebbeta - \by}^2 
= \Exp \normtwo{\bX (\rmS\bX)^+ Q(\rmS\by) - \by}^2 
= \Exp \normtwo{\bX (\rmS\bX)^+ (\rmS\by + \be) - \by}^2 \\
&=\Exp \normtwo{\bX (\rmS \bX)^+ \rmS \by - \by}^2 
+ \Exp \normtwo{\bX (\rmS \bX)^+ \be}^2 + \Exp \big[ \big(\bX (\rmS \bX)^+ \rmS \by - \by\big)^\top \bX (\rmS \bX)^+ \be \big] \\
&=\Exp \normtwo{\bX (\rmS \bX)^+ \rmS \by - \by}^2 
+ \Exp \normtwo{\bX (\rmS \bX)^+ \be}^2,
\end{aligned}
\end{equation}
where the last equality follows as the cross term disappears because,
\begin{equation}
\Exp \big[ \big(\bX (\rmS \bX)^+ \rmS \by - \by\big)^\top \bX (\rmS \bX)^+ \be \big] 
= 
\Exp_{\rmS} \big[ \big(\bX (\rmS \bX)^+ \rmS \by - \by\big)^\top \bX (\rmS \bX)^+ \be \big] 
= \bzero,
\end{equation}
where the last equality follows from  $\Exp_{Q} [\be] = \bzero$ when $Q$ is a uniformly dithered quantizer (Lemma~\ref{lemma:unif_dit_qua}).

Let  $\bE \triangleq \{e_{ij}\} = Q(\rmS\bY) - \rmS \bY \in \real^{m \times q}$  denote the quantization error matrix. 
Generalizing \eqref{equation:mls_quan_onecol} to the matrix least squares problem by treating each column $\by_i$ separately and summing over all columns yields:
\begin{align}
\Exp \normfbig{\bX \widetilde{\rmB} - \bY}^2 
&= \Exp \normf{\bX (\rmS \bX)^+ \rmS \bY - \bY}^2 
+ \Exp \normf{\bX (\rmS \bX)^+ \bE}^2\\
&=\frac{m-1}{m-r-1}\normf{\bX\widehatbB - \bY}^2
+ \Exp \normf{\bX (\rmS \bX)^+ \bE}^2, \label{equation:quanres_sktls_twoterm}
\end{align}
where the last equality follows from \eqref{equation:mls_lske_mainres2}.

\paragraph{Upper bounding $\Exp\normf{\bX (\rmS \bX)^+ \bE}^2$.} 
We now upper bound the second term in \eqref{equation:quanres_sktls_twoterm}. \eqref{equation:quanres_sktls_twoterm}. Since $\normf{ \bA}^2 = \trace(\bA^\top \bA )$ for any matrix $\bA$, using the cyclic invariance of trace, whence we have 
\begin{align}
\Exp \normf{\bX (\rmS \bX)^+ \bE}^2 &= \Exp \left[ \trace \left( \bE^\top \big((\rmS \bX)^+\big)^\top \bX^\top \bX (\rmS \bX)^+ \bE \right) \right]  \\
&= \Exp \left[ \trace \left( \big((\rmS \bX)^+\big)^\top \bX^\top \bX (\rmS \bX)^+ \bE\bE^\top \right) \right]  \\
&= \Exp_{\rmS} \left[ \trace \left( \big((\rmS \bX)^+\big)^\top \bX^\top \bX (\rmS \bX)^+ \Exp_Q \left[ \bE\bE^\top \right] \right) \right].\label{equation:quanres_sktls_twoterm_term2}
\end{align}
Since $R \geq\norm{\rmS \bY}_+$, from \eqref{equation:matrix_ls_ske} and Lemma~\ref{lemma:unif_dit_qua}, the $(i,j)$-th entry of the quantization error matrix $\bE$ satisfies
\begin{equation}
\Exp \left[ e_{ij} \right] = 0 
\qquad \text{and} \qquad 
\Var[e_{ij}] \leq \frac{\Delta^2}{4} = \frac{R^2}{(2^B - 1)^2},
\end{equation}
whence we have 
\begin{equation}
\Exp \big[ (\bE\bE^\top)_{ij} \big] = \sum_{k=1}^q \Exp \left[ e_{ik} e_{jk} \right] 
= 
\begin{cases}
q \Var[e_{ik}] \leq \frac{q \Delta^2}{4}, & \text{for } i = j; \\
0, & \text{for } i \neq j.
\end{cases}
\end{equation}
Therefore, the expectation $\Exp_Q \left[ \bE\bE^\top \right]$ is a diagonal matrix whose diagonal elements are upper bounded by $\frac{q \Delta^2}{4}$. 
Let  $\rmZ \triangleq \big((\rmS \bX)^+\big)^\top \bX^\top \bX (\rmS \bX)^+ \in\real^{m\times m}$, which is a random matrix depends on $\rmS$. Then, \eqref{equation:quanres_sktls_twoterm_term2} simplifies to,
\begin{equation}\label{equation:quanres_sktls_twoterm_term3}
\Exp_{\rmS} \left[ \trace \left( \rmZ \cdot \Exp_Q \big[  \bE\bE^\top \big] \right) \right] = \Exp_{\rmS} \left[ \sum_{i=1}^m \rz_{ii} \left( \Exp_Q \big[ \bE\bE^\top  \big] \right)_{ii} \right] \leq \frac{q \Delta^2}{4} \Exp_{\rmS} \left[ \trace(\rmZ) \right].
\end{equation}
For the $\Exp_{\rmS} \left[ \trace(\rmZ) \right]$ term, we have 
\begin{align}
\Exp_{\rmS} \left[ \trace(\rmZ) \right]
&= \Exp_{\rmS} \left[ \normf{\bX(\rmS \bX)^+}^2 \right] \stackrel{\dag}{\leq} \Exp_{\rmS} \left[ \normf{(\rmS \bX)^+}^2 \right] \sigma_{\max}^2(\bX)  \\
&= \Exp_{\rmS} \left[ \trace \left( (\rmS \bX \bX^\top \rmS^\top)^{-1} \right) \right] \sigma_{\max}^2(\bX)  \\
&\stackrel{\ddagger}{\leq} \frac{\sigma_{\max}^2(\bX)}{\sigma_{\min}^2(\bX)} \trace \left( \Exp \left[ (\rmS \rmS^\top)^{-1} \right] \right)
\stackrel{*}{=} \frac{\sigma_{\max}^2(\bX)}{\sigma_{\min}^2(\bX)} \frac{m^2}{(n - m - 1)},\label{equation:quanres_sktls_twoterm_term4}
\end{align}
where  the inequality ($\dag$) follows from the fact that $\normf{\bA\bB}\leq \normtwo{\bA}\normf{\bB}$~\footnote{See Problem~\ref{prob:frob_spec_bound}} for any matrices $\bA$ and $\bB$ with appropriate dimensions, the inequality ($\ddagger$) follows from the Loewner order for matrix products~\footnote{See Problem~\ref{prob:loewner_order}.}, and the equality ($*$) follows from Example~\ref{example:wis_invwish}. Combining \eqref{equation:quanres_sktls_twoterm_term2}, \eqref{equation:quanres_sktls_twoterm_term3} and \eqref{equation:quanres_sktls_twoterm_term4} yields,
\begin{equation}
\Exp \normf{\bX (\rmS \bX)^+ \bE }^2 \leq \frac{q \Delta^2}{4} \frac{\sigma_{\max}^2(\bX)}{\sigma_{\min}^2(\bX)} \frac{m^2}{(n - m - 1)}.
\end{equation}
This completes the proof. 
\end{proof}

Therefore, the theorem shows the residual matrix of the sketching matrix least squares problem with quantized response is bounded by the residual of the standard matrix least squares problem plus a term that depends on the condition number of the data matrix $\bX$.

\index{Low-rank approximation}
\index{Randomized SVD}
\index{Singular value decomposition}
\section{Least Squares Using Randomized SVD}

\index{Singular value decomposition}
In Section~\ref{section:ls-via-svd}, we introduced how to obtain the least squares solution using SVD.
However, for a large matrix $\bX\in\real^{n\times p}$, calculating its SVD rquires $\mathcalO(np^2)$ flops, which becomes prohibitively expensive as the size of the data matrix increases.
To address this issue, a randomized algorithm samples some columns from $\bX$ to construct a smaller matrix $\bC \in \real^{n \times m}$ ($m<p$), aiming to approximate $\bX$ with these sampled columns, expressed as $\bC = \bX \bS$.  The goal is to minimize:
$$
\min_{\bB} \normf{\bC \bB - \bX}^2 = \min_{\bB} \normf{\bX \bS \bB - \bX}^2.
$$
This is column-wise decomposable:
$$
\begin{aligned}
&\arg\min_{\bbeta_i} \sum_{i=1}^p \normtwo{\bX \bS \bbeta_i - \bx_i}^2 = (\bX \bS)^+ \bx_i, \quad \forall\, i \in\{1,2,\ldots,p\}\\
&\implies 
\argmin_{\bB} \normf{\bX \bS \bB - \bX}^2 = (\bX \bS)^+ \bX.
\end{aligned}
$$
The matrix $\bX$ is then approximated by $\bX\approx \bX\bS\widehatbB = (\bX \bS)(\bX \bS)^+ \bX = \bC\bC^+ \bX$.
Given that $\bC\in\real^{n\times m}$ with $m<p$, the truncated SVD (TSVD) of $\bC\approx\bU_k\bSigma_k\bV_k^\top$---such that $\bC\bC^+\approx \bU_k \bU_k^\top$---costs $\mathcalO(nm^2)$, and the low-rank approximation $\underbrace{\bU_k}_{n\times k}\underbrace{\bU_k^\top\bX}_{k\times p}$ of $\bX$ thus costs less than the optimal rank-$k$ approximation of $\bX$ using the SVD of $\bX$ ($\mathcalO(np^2)$ flops).

The least squares using the rank-$k$ approximation $\bU_k \bU_k^\top \bX$ of $\bX$ then reduces the computational complexity and aligns  with the goal of the norm ratio methods introduced in Section~\ref{section:svd_nmratio_method}.
The following theorem establishes an upper bound for the error introduced by this low-rank approximation:
\begin{theoremHigh}[Randomized Low-Rank Approximation \citep{drineas2006fast}]\label{theorem:spe_err_ran_spe}
Let $\bX\in\real^{n\times p}$, and let $\bC\in\real^{n\times m}$ contain $m$ columns of $\bX$ with $m<p$. 
Suppose further that the rank-$k$ TSVD of $\bC$ is $\bC\approx\bU_k\bSigma_k\bV_k^\top$ ($k\leq m$). Then,
\begin{subequations}\label{equation:spe_err_ran_spe}
\begin{align}
\normtwo{\bX - \bU_k \bU_k^\top \bX}^2 &\leq \normtwo{\bX - \bX_k}^2 + 2 \normtwo{\bX\bX^\top - \bC\bC^\top}; \label{equation:spe_err_ran_spe1}\\
\normf{\bX - \bU_k \bU_k^\top \bX}^2 &\leq \normf{\bX - \bX_k}^2 + 2 \sqrt{k} \normf{\bX \bX^\top - \bC \bC^\top}, \label{equation:spe_err_ran_spe2}
\end{align}
\end{subequations}
where $\bX_k$ denotes the optimal rank-$k$ approximation of $\bX$ by TSVD (Theorem~\ref{theorem:young-theorem_frob}).
\end{theoremHigh}
Equation~\eqref{equation:spe_err_ran_spe} shows that the low-rank approximation by $\bU_k\bU_k^\top\bX$ is bounded by two terms, where the first term is the approximation error of the exact SVD, and the second term is the spectral norm (or Frobenius norm)  error in the matrix multiplication approximation.
\begin{proof}[of Theorem~\ref{theorem:spe_err_ran_spe}]
For brevity, we only prove \eqref{equation:spe_err_ran_spe1} here; \eqref{equation:spe_err_ran_spe2} follows similarly and is left as an exercise. 
It follows that 
$$
\small
\begin{aligned}
\normtwo{\bX - \bU_k \bU_k^\top \bX}
&\stackrel{\dag}{=} \max_{\normtwo{\ba} = 1} \normtwo{\ba^\top (\bX - \bU_k \bU_k^\top \bX)} 
=  \max_{\substack{\normtwo{\bb} = \normtwo{\bc} = 1, \\
\bb \in \cspace(\bU_k), \bc \in \cspace(\bU_k)^\perp, \\ 
\alpha^2 + \beta^2 = 1}} \normtwo{(\alpha \bb + \beta \bc)^\top (\bX - \bU_k \bU_k^\top \bX)} \\
&\stackrel{\ddag}{\leq} \max_{\normtwo{\bc} = 1, \bc \in \cspace(\bU_k)^\perp} \normtwo{\bc^\top (\bX - \bU_k \bU_k^\top \bX)} +
\max_{\normtwo{\bb} = 1, \bb \in \cspace(\bU_k)} \normtwo{\bb^\top (\bX - \bU_k \bU_k^\top \bX)} \\
&= \max_{\normtwo{\bc} = 1, \bc \in \cspace(\bU_k)^\perp} \normtwo{\bc^\top (\bX - \bU_k \bU_k^\top \bX)}
= \max_{\normtwo{\bc} = 1, \bc \in \cspace(\bU_k)^\perp} \normtwo{\bc^\top \bX},
\end{aligned}
$$
where the equality ($\dag$) follows from the definition of the spectral norm (Definition~\ref{definition:spectral_norm}), the inequality ($\ddag$) follows because $\alpha, \beta\leq 1$ and $\bb^\top\bU_k\bU_k^\top=\bzero$ can be proved by letting $\bb\triangleq\bU_k\bv$ with $\bv\neq\bzero$.
Taking squares
$$
\small
\begin{aligned}
& \max_{\normtwo{\bc} = 1, \bc \in \cspace(\bU_k)^\perp} \normtwo{\bc^\top \bX}^2 
= \max_{\normtwo{\bc} = 1, \bc \in \cspace(\bU_k)^\perp} \bc^\top \bC \bC^\top \bc + \bc^\top (\bX \bX^\top - \bC \bC^\top) \bc \\
&\stackrel{\dag}{\leq} \max_{\normtwo{\bc} = 1, \bc \in \cspace(\bU_k)^\perp} \sigma_{k+1}^2(\bC) + \normtwo{\bX \bX^\top - \bC \bC^\top}
\stackrel{\ddag}{\leq} \max_{\normtwo{\bc} = 1, \bc \in \cspace(\bU_k)^\perp} \sigma_{k+1}^2(\bX) + 2 \normtwo{\bX \bX^\top - \bC \bC^\top},
\end{aligned}
$$
where the inequality ($\dag$) follows from Problem~\ref{prob:raylei_ritz} ($\bC\bC^\top$ and $\bX\bX^\top-\bC\bC^\top$ are symmetric such that $\sigma_{k+1}^2(\bC) \equiv  \sigma_{k+1}(\bC\bC^\top)\equiv\lambda_{k+1}(\bC\bC^\top)$), and the inequality ($\ddag$) follows from the matrix perturbation result
$
\sigma_{k+1}(\bC \bC^\top) - \sigma_{k+1}(\bX \bX^\top)
\leq \normtwo{\bX \bX^\top - \bC \bC^\top}
$ by Problem~\ref{prob:weyl_theo}.
This completes the proof.
\end{proof}

\paragrapharrow{Gaussian sketching.}
Another interesting problem is the low-rank reconstruction of $\bX$ using the sampled columns in $\bC=\bX\bS$:
$$
\bB^* = \argmin_{\bB} \normf{\bC\bB-\bX}^2
=(\bX\bS)^+ \bX.
$$
Let $\bX_k$ denote the optimal rank-$k$ approximation of $\bX$ (Theorem~\ref{theorem:young-theorem_frob}).
It then follows that 
$$
\begin{aligned}
\normf{(\bX\bS)(\bX\bS)^+ \bX-\bX}^2
&\leq  \normf{(\bX\bS)(\bX_k\bS)^+ \bX_k-\bX}^2\\
&=\normf{\bX_k^\top (\bS^\top\bX_k^\top)^+ (\bS^\top\bX^\top)  -\bX^\top}^2
&\triangleq \normf{\bX_k^\top \widetildebB  -\bX^\top}^2
\end{aligned}
$$
where $\widetildebB \triangleq (\bS^\top\bX_k^\top)^+ (\bS^\top\bX^\top) = \argmin_{\bB} \normf{\bS^\top\bX_k^\top\bB - \bS^\top\bX^\top}^2$ by Theorem~\ref{theorem:svd-deficient-rank}.
This optimization problem is known as the sketched problem of $\argmin_{\bB} \normf{\bX_k^\top\bB - \bX^\top}^2$, where the sketching is performed by $\bS^\top$.

When treating $\bS$ as a Gaussian sketching matrix, \eqref{equation:mls_lske_mainres2} provides the Frobenius norm error bound:
\begin{equation}
\Exp \normf{\bX\rmS(\bX\rmS)^+ \bX - \bX }^2 
\leq \Exp \normf{\bX_k^\top\widetilde{\rmB} - \bX^\top }^2 
\leq \frac{m-1}{m-k-1} \normf{\bX_k - \bX}^2,
\end{equation}
which is valid for any $k \in \{1,2, \ldots, \rank(\bX)\}$.

\index{Singular value decomposition}
\paragrapharrow{Randomized SVD.}
The randomized SVD approach is closely related to the randomized low-rank approximation method introduced here.
Since $\bC\bC^+ \bX = (\bX\bS)(\bX\bS)^+ \bX \approx \bX$, we can calculate the reduced QR decomposition of $\bX\bS = \bQ\bR$ ($\bQ\in\real^{n\times m}$ and $\bR\in\real^{m\times m}$, which costs $\mathcal(nm^2)$; see Section~\ref{section:ls_qr_gen}), whence we have 
$$
\bC\bC^+\bX =\bQ\bQ^\top \bX \approx \bX,
$$
i.e., $\bQ$ approximates the column space of $\bX$.
Calculating the SVD $\bQ^\top \bX = \bU \bSigma \bV^\top\in\real^{m\times p}$ (which costs $\mathcal(mp^2)$),  the approximate  SVD of $\bX$ is then given by $\bX \approx (\bQ\bU) \bSigma \bV^\top$.
For finding the rank-$k$ low-rank approximation, 
\citet{halko2011finding} shows that
$$
\Exp \normtwo{\bX - \bQ\bQ^\top \bX} \leq \left(1 + \frac{4\sqrt{m}}{m-k-1} \sqrt{\min(n,p)}\right) \sigma_{k+1}.
$$
Thus, the expected approximation error is upper bounded by this expression for any target rank $k$.

\begin{problemset}
\item Prove \eqref{equation:mean_gauslef_i}. That is, let $\rmS\in\real^{m\times n}$ be a random matrix whose entries are independent random variables distributed as $\rs_{ij}\sim \frac{1}{\sqrt{m}}\normal(0,1)$. Show that $\Exp[\rmS^\top \rmS] = \bI $.
\item \label{prob:frob_spec_bound} Let $\bA\in\real^{m\times n}$ and $\bB\in\real^{n\times p}$. Show that $\normf{\bA\bB}\leq \normtwo{\bA}\normf{\bB}$, where $\normf{\cdot}$  denotes the Frobenius norm and  $\normtwo{\cdot}$ denote the spectral norm of a matrix. \textit{Hint: use the definition of Frobenius and spectral norms; Definitions~\ref{definition:frobernius-in-svd} and \ref{definition:spectral_norm}.}

\item \label{prob:loewner_order} Let $\bA\in\real^{m\times n}$ and $\bB\in\real^{n\times p}$. Show that 
$$
\sigma_{\min}^2(\bA) \bB^\top \bB \preceq \bB^\top \bA^\top \bA \bB \preceq \sigma_{\max}^2(\bA) \bB^\top \bB.
$$
\textit{Hint: Prove the first result in the order $
\bx^\top \bB^\top \bA^\top \bA \bB \bx = \normtwo{\bA \bB \bx }^2 
\stackrel{*}{\geq}
\sigma_{\min}^2(\bA) \normtwo{\bB \bx}^2 = \bx^\top \big(\sigma_{\min}^2(\bA) \bB^\top \bB\big) \bx
$, and prove the inequality ($*$) using SVD.}

\item \label{prob:maxmag_norm} Show that the function $\norm{\cdot}_+:\real^{n\times p}\rightarrow \real$, which returns the maximum absolute value of any entry in the matrix, defines a valid matrix norm satisfying all the conditions of a matrix norm introduced in Definition~\ref{definition:matrix-norm}. Is this norm submultiplicative?

\index{Rayleigh-Ritz theorem}
\item \label{prob:raylei_ritz} \textbf{Rayleigh-Ritz theorem.} Let $\bA\in\real^{n\times n}$ be symmetric with the spectral decomposition $\bA=\bQ\bLambda\bQ^\top$.
Show that if $\mathcalV$ is the subspace spanned by  $\{\bq_p, \bq_{p+1}, \ldots, \bq_q\}$, then 
\begin{equation}\label{equation:rr_rr_2}
\begin{aligned}
	&\mathop{\max}_{\bx\neq \bzero, \bx\in\mathcalV}
	\frac{\bx^\top\bA\bx}{\bx^\top\bx}
	=
	\lambda_q
	\qquad\text{and}\qquad
	\mathop{\min}_{\bx\neq \bzero, \bx\in\mathcalV}
	\frac{\bx^\top\bA\bx}{\bx^\top\bx}
	=
	\lambda_p,
\end{aligned}
\end{equation}
where $\lambda_1\leq \lambda_2\leq \ldots\leq \lambda_n$ denote the eigenvalues of $\bA$.

\index{Weyl's theorem}
\item \label{prob:weyl_theo} \textbf{Weyl's theorem.} Let $\bA\in\real^{n\times n}$ be symmetric. Show that 
\begin{equation}\label{equation:lam_ineq_pert_max}
	\abs{\lambda_k(\bA) - \lambda_k(\bA+\bE)} \leq \normtwo{\bE},\gap k\in\{1,2,\ldots, n\},\ \forall\, \bE,
\end{equation}
where $\lambda_k(\bA)$ denotes the $k$-th eigenvalue of $\bA$.
That is, the eigenvalues of a real symmetric matrix  are {stable under  small perturbations}.

\end{problemset}

\newpage
\chapter{The Bayesian Approach}\label{chapter:bayes_app_mle}
\begingroup
\hypersetup{
linkcolor=structurecolor,
linktoc=page,  
}
\minitoc \newpage
\endgroup
\section{The Bayesian Approach}\label{sec:bayesian-approach}
\lettrine{\color{caligraphcolor}I}
In contemporary statistics, Bayesian approaches have become increasingly significant and widely utilized. 
Thomas Bayes came up with this idea but died before publishing it. Fortunately, his friend Richard Price carried on his work and published the work in 1764. 
In this section, we outline the foundational principles of Bayesian methodology, using the \textit{Beta-Bernoulli model} as an introductory example to highlight the benefits of Bayesian models. Additionally, we explore powerful Bayesian techniques for linear models and their relationship to ordinary least squares.

Note that we have previously introduced the Bayesian estimation method in Section~\ref{section:bayes_esti}, from which the Laplace approximation method was derived (Section~\ref{section:laplace_approx}). The Laplace approximation method further enables the definition of the useful BIC criterion for model selection; see Section~\ref{section:bic}. Here, we briefly compare frequentist and Bayesian approaches and introduce the concept of conjugate priors.

\index{Bayes' theorem}
\index{Bayes' rule}
The core  idea of the Bayesian approach, in a nutshell, involves assuming a \textit{prior} probability distribution over the unknown parameter $\btheta$ with hyper-parameters $\balpha$ (i.e., $p(\btheta)=p(\btheta\mid  \balpha)$)---a distribution representing the plausibility of each possible value of $\btheta$ before observing the data. 
Consequently, to infer information about $\btheta$, one considers the conditional distribution of $\btheta$  given the observed data, known as the posterior distribution. The posterior reflects the plausibility of each potential value of $\btheta$  after taking the data into account. We then apply probability rules to address specific questions of interest \citep{fahrmeir2007regression, hoff2009first}. For instance, when determining the parameter based on the maximum posterior probability of $\btheta$, we refer to the maximum a posteriori (MAP) estimator; see Definition~\ref{definition:map_esti}.

\subsection*{Comparison of Frequentist and Bayesian Approaches}
In the field of statistical inference, there are traditionally two distinct schools of thought: the \textit{frequentist approach} and the \textit{Bayesian approach}. While we will not explore their underlying philosophical differences in this discussion, we will focus on how they differ in their treatment of the parameter space.

Typically, we use a probability distribution $p(\mathcalX \mid \btheta)$ to describe a random variable. For example, we might assume that a random variable $\rx$ follows a Bernoulli distribution. In general, any probability distribution includes one or more parameters, and only when these parameters are specified can the distribution be fully determined.

When the values of these parameters are unknown, our goal is to estimate them so that the distribution can be properly defined. Once estimated, the distribution becomes useful for making meaningful inferences or predictions. The frequentist and Bayesian approaches differ fundamentally in how they interpret and handle such unknown parameters.

\paragrapharrow{Frequentist approaches.}
The frequentist approach assumes that the parameters of a probability distribution are fixed but unknown constants---essentially just numerical values. 
These parameters define the distribution $p(\mathcalX \mid \btheta)$ through parameterization, and frequentists assume there exists a single true value of $\btheta$ within the parameter space. 
The objective is to estimate this optimal (true) value. To do so, frequentists often use methods such as maximum likelihood estimation (see Section~\ref{section:mle_method}), which seeks the value of $\btheta$ that makes the observed data most probable.

To estimate the parameter  $\btheta$, we require a set of observed samples from the random variable $\mathcalX$. These samples are assumed to come from the same underlying distribution $ p(\mathcalX \mid \btheta) $, meaning they are identically distributed. Furthermore, in many cases, the samples are assumed to be independent and identically distributed.

We denote the collection of observed samples as $ \mathcalX = \{\bx_1, \bx_2, \ldots, \bx_n\} $. The probability of observing each individual sample is given by $ p(\bx_i\mid \btheta) $. Assuming independence, the joint probability of observing all samples together is:
$$
p(\mathcalX\mid \btheta) = p(\bx_1, \bx_2, \ldots, \bx_n \mid  \btheta) = \prod_{i=1}^{n} p(\bx_i\mid  \btheta),
$$
where  $ p(\mathcalX\mid \btheta) $ is commonly referred to as the likelihood function. 
This function is conventionally denoted by  $ \mathcalL(\btheta; \mathcalX) $. 
The idea behind maximum likelihood estimation is to find the value of $\btheta$ that maximizes the likelihood of observing the given dataset.
Therefore, the maximum likelihood estimate is defined as:
$$ 
\widehat{\btheta}_{\ML} = \arg\max_{\btheta} p(\mathcalX\mid \btheta) = \arg\max_{\btheta} \mathcalL(\btheta; \mathcalX).
$$
Once we have estimated $\btheta$, the corresponding probability distribution becomes $ p(\mathcalX\mid  \widehat{\btheta}_{\ML}) $, and we can use it to make predictions about new observations:
$$ 
p(\mathcalX = \bx_{\new}) = p(\mathcalX = \bx_{\new}\mid \widehat{\btheta}_{\ML}) .
$$

\paragrapharrow{Bayesian approaches.}
However, the Bayesian approach takes a fundamentally different perspective.
In Bayesian statistics, we recognize that since the parameter $\btheta$ is unknown, any value within the parameter space is, at least initially, possible. Therefore, rather than treating $\btheta$ as a fixed constant, it is considered a random variable---a central tenet of the Bayesian philosophy is that all unknown quantities should be treated probabilistically.
This leads to two key ideas:
\begin{itemize}
\item  The probability distribution over the data $\mathcalX$ is defined conditionally on the unknown parameter $\btheta$, written as $p(\mathcalX\mid \btheta)$. This formulation reflects a crucial difference from the frequentist perspective.
\item Using Bayes' theorem (Theorem~\ref{theorem:baye_theo_mle}), we reverse the relationship between the observed data $\mathcalX$ and the parameter $\btheta$, allowing us to infer the distribution of $\btheta$ given the data:
\begin{equation}\label{equation:ost_thi_x_2}
\begin{aligned}
p(\btheta \mid  \mathcalX) 
= \frac{p(\mathcalX \mid  \btheta ) p(\btheta )}{p(\mathcalX )} 
= \frac{p(\mathcalX \mid  \btheta ) p(\btheta )}{\int_{\btheta}  p(\mathcalX, \btheta ) }  
= \frac{p(\mathcalX \mid  \btheta ) p(\btheta )}{\int_{\btheta}  p(\mathcalX \mid  \btheta ) p(\btheta ) } 
\propto p(\mathcalX \mid  \btheta ) p(\btheta ),
\end{aligned}
\end{equation}
where $\mathcalX$ represents the observed dataset or random variable, and the notation $f(x)\propto g(x)$ means that $f$ is proportional to $g$ up to a normalizing constant. 
\end{itemize}
Through Bayes' theorem, we formally define the relationship between the random variable $ \mathcalX $ and the parameter variable $ \btheta $. The term $ p(\btheta) $ in the formula represents the marginal probability distribution of the parameter variable $ \btheta $, which is the probability distribution of $ \btheta $ before observing any data. 
Therefore, we call it the \textit{prior distribution} of $ \btheta $. 

In practice, we typically do not know the true distribution of $\btheta$, so we assume a known form for the prior. 
Often, this prior depends on one or more hyper-parameters $\balpha$, in which case we write the prior as $p(\btheta\mid \balpha)$. Substituting this into Equation~\eqref{equation:ost_thi_x_2}, we obtain:
\begin{equation}\label{equation:ost_thi_x_3}
\begin{aligned}
p(\btheta \mid  \mathcalX, \balpha) 
&= \frac{p(\mathcalX \mid  \btheta ) p(\btheta \mid  \balpha)}{p(\mathcalX \mid  \balpha)} = \frac{p(\mathcalX \mid  \btheta ) p(\btheta \mid  \balpha)}{\int_{\btheta}  p(\mathcalX, \btheta \mid  \balpha) }   \\
&= \frac{p(\mathcalX \mid  \btheta ) p(\btheta \mid  \balpha)}{\int_{\btheta}  p(\mathcalX \mid  \btheta ) p(\btheta \mid  \balpha) } 
\propto p(\mathcalX \mid  \btheta ) p(\btheta \mid  \balpha).
\end{aligned}
\end{equation}
In the remainder of this discussion, we will omit the hyper-parameter $\balpha$ for simplicity.

Given the prior distribution $ p(\btheta) $ and the conditional probability distribution $ p(\cdot \mid \btheta) $, we observe a set of samples from the random variable $ \mathcalX $, denoted as $ \mathcalX = \{\bx_1, \bx_2, \ldots, \bx_n\} $. 
Unlike frequentist approaches, these samples are assumed to be drawn independently from the joint distribution: $ p(\mathcalX, \btheta) = p(\mathcalX\mid \btheta)p(\btheta) $; as opposed to be drawn from the conditional distribution $p(\mathcalX\mid \btheta)$ in frequentist approaches. 

Our goal is to infer the true distribution of the parameter $ \btheta $ given the observed data $\mathcalX$. 
That is, we aim to compute the posterior distribution $ p(\btheta \mid  \mathcalX)  $, which by \eqref{equation:ost_thi_x_2} is
$$
p(\btheta \mid  \mathcalX) 
= \frac{p(\mathcalX \mid  \btheta ) p(\btheta )}{p(\mathcalX )} .
$$
The conditional probability distribution $ p(\btheta\mid \mathcalX) $ is called the \textit{posterior distribution} of the parameter $ \btheta $, because it is the probability distribution of $ \btheta $ under the condition of observing the samples. 
This posterior distribution represents our updated belief about $ \btheta $ after observing the sample set. It serves as our estimate of the parameter $ \btheta $, incorporating both the prior knowledge and the information provided by the data.

Alternatively, the posterior can be interpreted as being proportional to the product of the likelihood and the prior:
\begin{equation}
\mathrm{Posterior} = \frac{\mathrm{Likelihood}\times \mathrm{Prior}}{\mathrm{Marginal\,\, likelihood}}  \propto \mathrm{Likelihood} \times \mathrm{Prior}, 
\end{equation}
where the proportionality constant is determined by the marginal likelihood (or evidence), $p(\mathcalX)$. This formulation allows us to quantify uncertainty in the parameter estimates.

\paragrapharrow{Predictive inference.}
For a new observation  $\bx_{\new}$ maximum likelihood estimation  predicts using the likelihood evaluated at the MLE estimate:
$p(\mathcalX = \bx_{\new}\mid \widehat{\btheta}_{\ML})$.
In contrast, Bayesian inference uses the posterior distribution $p(\btheta \mid  \mathcalX) $
to compute the predictive distribution for new samples. Assuming the generative process $ p(\mathcalX, \btheta) = p(\btheta)p(\mathcalX\mid \btheta) $, the predictive distribution becomes:
$$ 
p(\mathcalX = \bx_{\new}) = \int  p(\mathcalX = \bx_{\new}\mid \btheta)p(\btheta\mid \mathcalX)  d\btheta. 
$$
If the problem  follows from a generative process $\rvy\sim p(\by\mid \bx,\btheta)$, e.g., $\ry\sim \bbeta^\top\bx_{\new} +\epsilon$ in the Gauss-Markov linear model. 
Then the predictive distribution is 
$$
p(\by^\prime\mid\bx_{\new},\mathcalX, \mathcalY)=\int p(\by^\prime \mid \bx_{\new}, \btheta)p(\btheta \mid \mathcalX,\mathcalY)d\btheta.
$$

\paragrapharrow{Point estimates from the posterior.}
Instead of using the full posterior distribution, one may summarize it using point estimates. Common choices include:
\begin{itemize}
\item \textit{The posterior mean.} Often  referred to as the \textit{Bayesian estimate}; that is, using the expected value $ \widehat{\btheta}_{\text{mean}} $ of $ \btheta $ from the posterior distribution:
\begin{equation}
\begin{aligned}
\widehat{\btheta}_{\text{Bayes}}\triangleq 
&\widehat{\btheta}_{\text{mean}} = \Exp_{p(\btheta\mid \mathcalX)}[\btheta] = \int \btheta p(\btheta\mid \mathcalX) d\btheta 
\quad \implies \quad
p(\mathcalX = \bx_{\new}\mid \btheta = \widehat{\btheta}_{\text{mean}}) .
\end{aligned}
\end{equation}

\item \textit{The posterior median.} As one can expect, the median value  $ \widehat{\btheta}_{\text{median}} $ of $ \btheta $ from the posterior distribution $p(\btheta\mid \mathcalX)$ can also be regarded as a point estimate:
\begin{equation}
	\widehat{\btheta}_{\text{median}} = \text{ median of } p(\btheta\mid \mathcalX)
	\qquad \implies \qquad
	p(\mathcalX = \bx_{\new}\mid \btheta = \widehat{\btheta}_{\text{median}}) .
\end{equation}
\item \textit{The posterior maximum.} See the following paragraphs. 
\end{itemize}

Computing these point estimates requires knowing the exact form of the posterior distribution $p(\btheta\mid \mathcalX)$.
However, the denominator in Bayes' rule---the marginal likelihood or evidence in \eqref{equation:ost_thi_x_2}---can be difficult to compute:
$$
p(\mathcalX) = \int p(\mathcalX\mid \btheta) p(\btheta) d\btheta.
$$
This integral spans the entire parameter space and is often analytically intractable or computationally expensive.

To address this issue, two common strategies are used:
\begin{itemize}
\item Using conjugate priors:  If the prior and likelihood belong to conjugate families, the posterior has the same functional form as the prior, simplifying computation.
\item Using maximum a posterior (MAP) estimate: Instead of computing the full posterior, MAP finds the mode of the posterior distribution and avoids computing the evidence.
\end{itemize}

\paragrapharrow{Conjugate prior.}
Formally, we define the conjugate prior as follows:
\begin{definition}[Conjugate prior]\label{definition:conjug_prio}
In Bayesian inference, if the posterior distribution and the prior distribution belong to the same probability distribution family, then this prior distribution is called a \textit{conjugate prior}. Note that since the posterior distribution is obtained by multiplying the prior with the likelihood, conjugacy refers to the fact that the prior and the likelihood are conjugate. The conjugate prior, when multiplied by the likelihood, does not change the functional form of the distribution, so the posterior has the same form as the prior.
\end{definition}

Using a conjugate prior ensures that the posterior distribution has the same form as the prior, which often allows us to derive the posterior directly without having to compute the marginal likelihood $ p(\mathcalX )$ in \eqref{equation:ost_thi_x_2}. This significantly reduces the computational complexity involved in determining the posterior distribution.

For example, the conjugate prior for the likelihood function of a Gaussian distribution is itself a Gaussian distribution. Similarly, the conjugate prior for the Bernoulli likelihood is the Beta distribution, and for the categorical likelihood, it is the Dirichlet distribution. For further examples, see \citet{hoff2009first, lu2022bayesian}.

Despite their advantages, conjugate priors do have limitations. First, only members of the exponential family of distributions have conjugate priors. Second, choosing a conjugate prior is often motivated more by computational convenience than by the desire to achieve more accurate parameter estimation.

\paragrapharrow{MAP estimation.}
As mentioned above, in Bayesian inference, we often cannot compute the expectation of the posterior probability distribution directly. Even if we derive the exact form of the posterior distribution, calculating its expectation typically involves integration, which is analytically or computationally difficult in many cases. Therefore, an alternative point estimation method known as maximum a posteriori (MAP) estimation is commonly used; see Definition~\ref{definition:map_esti}:
$$
\widehat{\btheta}_{\MAP} = \arg\max_{\btheta} p(\btheta \mid  \mathcalX).
$$

The idea behind MAP estimation is to use the value of $\btheta$ that maximizes the posterior probability as our estimate, rather than using the expectation or median of the posterior distribution. From Equation~\eqref{equation:ost_thi_x_2}, we know that the posterior is proportional to the product of the likelihood and the prior:
\begin{equation}\label{equation:map_post}
p(\btheta \mid  \mathcalX) \propto p(\mathcalX \mid  \btheta) p(\btheta) \equiv \text{Likelihood} \times \text{Prior}.
\end{equation}
To obtain the MAP estimate, we only need to maximize the numerator of the posterior (that is, the product of the likelihood and the prior), so there is no need to compute the marginal likelihood $p(\mathcalX)$. This simplifies the computation significantly:
$$
\widehat{\btheta}_{\MAP} = \arg\max_{\btheta} p(\btheta \mid  \mathcalX)
\equiv \arg\max_{\btheta} p(\mathcalX \mid  \btheta) p(\btheta).
$$
Once we have obtained the MAP estimate $\widehat{\btheta}_{\MAP}$, we can use it as a point estimate for the unknown parameter $\btheta$. This allows us to make predictions for new data points:
$$
p(\mathcalX = \bx_{\new}) = p(\mathcalX = \bx_{\new} \mid  \btheta = \widehat{\btheta}_{\MAP}).
$$

Furthermore, MAP estimation can be interpreted as a modified version of maximum likelihood estimation, where a prior distribution acts as a regularizer or constraint:
$$
\widehat{\btheta}_{\MAP} = \arg\max_{\btheta} p(\mathcalX \mid  \btheta) p(\btheta)
= \arg\max_{\btheta} \{\ln p(\mathcalX \mid  \btheta) + \ln p(\btheta)\}.
$$
If the prior is flat---meaning $ \ln p(\btheta)=0$, for instance, when $p(\btheta)$ is a uniform distribution over the entire parameter space---then the MAP estimate reduces exactly to the MLE.

\paragrapharrow{Frequentists V.S. Bayesian in a nutshell.}
The \textit{frequentist approach} to statistics  evaluates statistical procedures based on a probability distribution over all possible data sets.
To be more specific, frequentists consider the parameter vector $\btheta$ to be fixed (albeit unknown), while introducing uncertainty over possible data sets $\mathcalX$. Frequentist methods are often considered more objective as they avoid incorporating subjective prior information. 
In contrast, Bayesian methods allow for the incorporation of prior beliefs. The Bayesian approach treats the data set $\mathcalX$ as given, while introducing uncertainty over $\btheta$. 
Moreover, though we will not use any hierarchical models in this book, Bayesian modeling is often more flexible, allowing for the specification of complex hierarchical models. 
This flexibility is advantageous in cases where the underlying data-generating process is intricate.
However, statisticians nowadays tend to move comfortably between these approaches and popular statistical procedures often combine both of them,  incorporating Bayesian methods for certain aspects of the analysis while using frequentist methods for others. 
For instance, empirical Bayesian methods  have a Bayesian spirit but are not strictly Bayesian; their analysis is frequently frequentist \citep{haugh2021tutorial}.

\index{Beta-Bernoulli model}
\section{An Appetizer: Beta-Bernoulli Model}\label{sec:beta-bernoulli}
We now formally introduce the Beta-Bernoulli model to illustrate how the Bayesian approach works.
The Bernoulli distribution models binary outcomes---that is, it assigns probabilities to two possible values, typically denoted as 0 and 1. The likelihood under this model is defined by the probability mass function of the Bernoulli distribution:
\begin{equation}
\bernoulli(x\mid \theta) = p(x\mid \theta) = \theta^x (1-\theta)^{1-x} \indicator(x\in \{0,1\}). \nonumber
\end{equation}
This means that:
$$
\bernoulli(x\mid \theta)=p(x\mid \theta)=\left\{
\begin{aligned}
&1-\theta ,& \mathrm{\,\,if\,\,} x = 0;  \\
&\theta , &\mathrm{\,\,if\,\,} x =1,
\end{aligned}
\right.
$$
where $\theta$ represents the probability of observing the outcome 1, and $1-\theta$ is the probability of observing 0.
The mean (or expected value) of the Bernoulli distribution is simply $\theta$. 
Suppose we are given a dataset $\mathcalX=\{x_1, x_2, ..., x_n\}$, where each $x_i$ is i.i.d. according to $\bernoulli(x\mid \theta)$. Then, the likelihood under the Bernoulli distribution is given by:
$$
\begin{aligned}
\text{Likelihood} = 	p(\mathcalX \mid \theta) &= \theta^{\sum x_i} (1-\theta)^{n-\sum x_i},
\end{aligned}
$$
which is a distribution on $\mathcalX$ and is called the \textit{likelihood function} on $\mathcalX$.

In this model, the prior distribution follows the probability density function of the \textit{Beta distribution}, which is defined as:
\begin{equation}
\mathrm{Prior} = \betadist(\theta\mid a, b)=	p(\theta\mid a, b) =\frac{1}{B(a,b)} \theta^{a-1}(1-\theta)^{b-1} \indicator(0\leq \theta \leq 1), \nonumber
\end{equation}
where $B(a,b)$ denotes   \textit{Euler's beta function}, serving as a normalization constant.  
And $\indicator(a\leq x\leq b)$ is a step function that has a value of 1 when $a\leq x\leq b$  and 0 otherwise (when $x<a$ or $a>b$). Figure~\ref{fig:dists_beta}  compares different parameters for the Beta distribution. 
Specifically, when $a=b=1$, the Beta distribution reduces to a \textit{uniform distribution} over the support of $[0,1]$.

\index{Beta distribution}
We place a Beta prior over the parameter $\theta$ of the Bernoulli distribution. The posterior distribution is then obtained as follows:
\begin{equation}
\begin{aligned}
\mathrm{Posterior} = p(\theta\mid \mathcalX) &\propto p(\mathcalX \mid \theta) p(\theta\mid a,b) \\
&=\theta^{\sum x_i} (1-\theta)^{n-\sum x_i} \times \frac{1}{B(a,b)} \theta^{a-1}(1-\theta)^{b-1}\indicator(0<\theta<1) \\
&\propto \theta^{a+\sum x_i-1}(1-\theta)^{b+n-\sum x_i-1} \indicator(0<\theta<1) \\
&\propto \betadist\left(\theta \mid  a+\sum x_i, b+n-\sum x_i\right). \nonumber
\end{aligned}
\end{equation}

\index{Conjugate prior}
We observe that the posterior distribution has the same functional form as the prior. When this occurs, we refer to the prior as a  \textit{conjugate prior} (Definition~\ref{definition:conjug_prio}). A conjugate prior is particularly useful because it simplifies computation: it allows for straightforward derivation of the posterior probability density function, its derivatives, and even sampling from the posterior.

Using conjugate priors has a key advantage: it preserves the mathematical form of the prior during Bayesian updating. As a result, the posterior can often be expressed in closed form, eliminating the need for complex numerical methods or approximations.

\begin{remark}[Prior information in Beta-Bernoulli model]
Comparing the forms of the prior and posterior distributions, we see that the hyper-parameter $a$ can be interpreted as representing the number of ``prior successes" (i.e., outcomes equal to 1), while $b$ corresponds to the number of ``prior failures" (i.e., outcomes equal to 0). The sum $a+b$ reflects the strength or confidence in the prior information---effectively acting like a prior sample size.
\end{remark}

\index{Bayesian estimator}
\index{Point estimator}
\index{Method of moments}
\index{Maximum likelihood estimation}
\index{Maximum likelihood estimator}
In the Beta-Bernoulli example, similar to the maximum likelihood estimator or the method of moments (MoM)---which uses moment information to estimate model parameters---the Bayesian framework can also be used to obtain  estimates. However, instead of directly estimating a single value, the Bayesian approach provides a full posterior distribution over the parameter of interest: $p(\theta \mid \mathcalX)$.

When making predictions for new data observations, we do not use a fixed parameter value from the model $p(x_{n+1} \mid \theta)$ directly. Instead, we marginalize out the uncertainty in $\theta$ by integrating over the posterior distribution:
\begin{equation}
p(x_{n+1} \mid  \mathcalX) = \int p(x_{n+1} \mid  \theta) p(\theta \mid  \mathcalX) d\theta.\nonumber
\end{equation}
In other words, $x_{n+1}$ depends on $\mathcalX$. The observed data $\mathcalX$ provide information on $\theta$, which in turn provides information on $x_{n+1}$ (i.e., $\mathcalX \rightarrow \theta \rightarrow x_{n+1}$).

\begin{example}[Amount of data matters]\label{example:amountofdata}
Bayesian methods can be advantageous in cases of small sample sizes or sparse data, where traditional frequentist methods may encounter difficulties.
Suppose we have three observations for the success in  a Bernoulli experiment:
\begin{enumerate}[(1).]
\item  10 out of 10 trials are observed to be success (1's);
\item  48 out of 50 trials are observed to be success (1's);
\item  186 out of 200 trials  are observed to be success (1's).
\end{enumerate}

A common frequentist estimate of the success probability would be 100\%, 96\%, and 93\% for cases 1, 2, and 3, respectively. However, in case 1, an observation based on only 10 trials may be unreliable, as such a small sample size makes the estimate more sensitive to noise.

Now suppose we put a $\betadist(1,1)$ (a uniform distribution, see Figure~\ref{fig:dists_beta}) prior on the Bernoulli distribution parameter. 
Then the posterior probability of success for each case becomes $\frac{11}{12}=91.6\%$, $\frac{49}{52}=94.2\%$, and $\frac{187}{202}=92.6\%$, respectively. 
Interestingly, under this Bayesian approach, the estimated success probability for case 1 is actually lower than for case 2, despite both having perfect success rates in the observed data. This reflects the influence of the small sample size in case 1, which results in greater shrinkage toward the prior.

This Bayesian perspective naturally incorporates both the amount of data and the observed average into the final estimate. The specific form shown here is known as Laplace's rule of succession \citep{ollivier2015laplace}, which adjusts the observed frequency by adding one to both the count of successes and failures. This ``add-one" rule avoids assigning zero probability to unseen events and corresponds to using a uniform prior in a Bayesian framework.
\end{example}
\index{Laplace's add-one rule}

\index{Bayesian estimator}
\begin{remark}[Why Bayes?]
The previous example illustrates that Bayesian models incorporate prior information about the parameters, making them particularly effective for regularizing regression problems when data are limited. This is one reason why the Bayesian approach has attracted widespread attention over the decades.

In the Bayesian framework, the prior distribution $p(\theta)$ and the likelihood function $p(x\mid \theta)$ together represent a rational individual's initial beliefs about the parameter $\theta$. Bayes' theorem then provides an optimal method for updating these beliefs in light of new data, resulting in the posterior distribution $p(\theta \mid x)$.

The prior $p(\theta)$  may not always accurately reflect true prior beliefs, and in such cases, it could be considered ``incorrect" or suboptimal. However, this does not necessarily render the resulting posterior $p(\theta \mid x)$ uninformative or useless. As famously stated: ``All models are wrong, but some are useful" \citep{box1987empirical}. If the prior $p(\theta)$  reasonably approximates our actual beliefs, then the resulting posterior $p(\theta \mid x)$ will also serve as a good approximation of the updated beliefs after observing the data.
\end{remark}


\begin{figure}[h!]
\centering
\includegraphics[width=0.35\textwidth]{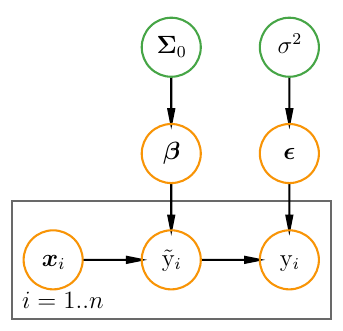}
\caption{Graphical representation of the Bayesian linear model with a zero-mean prior. Orange circles represent observed and latent variables, green circles denote prior variables, and plates represent repeated variables. 
In the graph, $\bbeta \sim \normal(\bzero, \bSigma_0)$, $\bepsilon \sim \normal(\bzero, \sigma^2 \bI)$, $\widetilde{\ry}_i = \bx_i^\top\bbeta$, and $\ry_i  = \widetilde{\ry}_i+\epsilon_i$.
}
\label{fig:blm_zeromean}
\end{figure}

\index{Zero-mean prior}
\section{Bayesian Linear Model: Zero-Mean Prior}\label{sec:bayesian-zero-mean}
We now introduce the application of Bayesian methods to linear regression models. 
Consider the standard linear model: 
$$
\rvy = \bX\bbeta + \bepsilon, 
$$ 
where $\bepsilon \sim \normal(\bzero, \sigma^2 \bI)$ and $\sigma^2$ is fixed. 
As discussed in Section~\ref{sec:lr-gaussian-noise}, the assumption of additive Gaussian noise leads naturally to a normal likelihood function. Let $\mathcalX=\mathcalX (\bx_{1:n})= \{\bx_1, \bx_2, \ldots, \bx_n\}$ be the observations of $n$ data points, and $\by=[y_1,y_2,\ldots, y_n]$ contains the corresponding responses.
Then, the joint likelihood is given by:
\begin{equation}
\mathrm{Likelihood} = \by \mid  \bX, \bbeta, \sigma^2 \sim \normal(\bX\bbeta, \sigma^2\bI). \nonumber
\end{equation}
Now suppose we place a multivariate Gaussian prior with zero mean on the weight vector:
\begin{equation}
\mathrm{Prior} = 	\bbeta \sim \normal(\bzero, \bSigma_0). \nonumber
\end{equation}
The graphical representation of this Bayesian linear model is shown in Figure~\ref{fig:blm_zeromean}. Applying Bayes' rule, which states that
 ``$\mathrm{Posterior} \propto \mathrm{Likelihood} \times \mathrm{Prior} $," we obtain the posterior distribution:
$$
\begin{aligned}
\mathrm{Posterior}
&= p(\bbeta\mid \by,\bX, \sigma^2)
\propto p(\by\mid \bX, \bbeta, \sigma^2) p(\bbeta \mid  \bSigma_0) \\
&= \frac{1}{(2\pi \sigma^2)^{n/2}} \exp\left\{-\frac{1}{2\sigma^2} (\by-\bX\bbeta)^\top(\by-\bX\bbeta)\right\}
\frac{1}{(2\pi)^{p/2}\abs{\bSigma_0}^{1/2}}\exp\left\{-\frac{1}{2} \bbeta^\top\bSigma_0^{-1}\bbeta\right\} \\
&\propto \exp\left\{-\frac{1}{2} (\bbeta - \bbeta_1)^\top \bSigma_1^{-1} (\bbeta - \bbeta_1)\right\}, 
\end{aligned}
$$
where $\bSigma_1 \triangleq \left(\frac{1}{\sigma^2} \bX^\top\bX + \bSigma_0^{-1}\right)^{-1}$ and $\bbeta_1 \triangleq  \left(\frac{1}{\sigma^2}\bX^\top\bX + \bSigma_0^{-1}\right)^{-1}(\frac{1}{\sigma^2}\bX^\top\by)$. 
Thus, the posterior distribution is also multivariate normal (i.e., it belongs to the same family as the prior), confirming that the Gaussian prior is conjugate to the Gaussian likelihood:
\begin{equation}
\mathrm{Posterior} = \bbeta\mid \by,\bX, \sigma^2 \sim \normal(\bbeta_1, \bSigma_1). \nonumber
\end{equation}

Note that we use $\{\bbeta_1,\bSigma_1\}$ to denote the posterior mean and posterior covariance in the zero-mean prior model. Similarly, the posterior mean and posterior covariance in semi-conjugate prior and full conjugate prior models will be denoted by $\{\bbeta_2,\bSigma_2\}$ and $\{\bbeta_3,\bSigma_3\}$, respectively (see later sections).

\subsection{Connection to Non-Bayesian Models}
We demonstrate that the Bayesian linear model with a zero-mean prior generalizes certain standard Gauss-Markov linear models.

\paragrapharrow{Connection to OLS.}
Importantly, in the Bayesian framework, there is no strict requirement for the design matrix $\bX$ to have full rank. This allows for more flexibility compared to classical linear regression.
However, if $\bX$ does have full column rank and we consider the limiting case where the prior becomes highly concentrated around zero, i.e., $\bSigma_0 \rightarrow \bzero$, then the posterior mean converges to the OLS estimate:
$$
\bbeta_1 \rightarrow \widehat{\bbeta} = (\bX^\top\bX)^{-1}\bX\by 
\qquad\text{as}\qquad 
\bSigma_0 \rightarrow \bzero.
$$
In this limit, the MAP estimate from the Bayesian model coincides with the OLS estimate.
Furthermore, the posterior distribution becomes:
$$
\bbeta\mid \by,\bX, \sigma^2 \sim \normal (\widehat{\bbeta}, \sigma^2(\bX^\top \bX)^{-1})
\qquad\text{as}\qquad 
\bSigma_0 \rightarrow \bzero,
$$
which shares the same form as the OLS estimator $\widehat{\bbeta} \sim \normal(\bbeta, \sigma^2(\bX^\top \bX)^{-1})$ under  Gaussian disturbances (see Theorem~\ref{theorem:samplding_dist_lse_gaussian}).

\index{Ridge regression}
\paragrapharrow{Connection to ridge regression.}
In the context of least squares approximation,   the utilization of $\bX\bbeta$ for approximating $\by$ introduces two potential issues: the risk of overfitting and the possibility of $\bX$ lacking full rank. 
Ridge regression addresses these concerns by regularizing large values of $\bbeta$, thereby favoring simpler models. 
Instead of minimizing the squared error along: $\normtwo{\by-\bX\bbeta}^2$, ridge regression minimizes the following objective function: $\normtwo{\by-\bX\bbeta}^2+\lambda\normtwo{\bbeta}^2$, where $\lambda$ is a hyper-parameter that controls the strength of the regularization and can be tuned as needed:
\begin{equation}
\mathop{\arg\min}_{\bbeta}{(\by-\bX\bbeta)^\top(\by-\bX\bbeta) + \lambda \bbeta^\top\bbeta}. \nonumber
\end{equation}
By differentiating this expression with respect to $\bbeta$ and setting the gradient to zero, we obtain the closed-form estimate:
\begin{equation}
\widehat{\bbeta}_{ridge} = (\bX^\top\bX + \lambda \bI)^{-1} \bX^\top\by, \nonumber
\end{equation}
in which case, $(\bX^\top\bX + \lambda \bI)$ is always invertible even when $\bX$ does not have full rank. 

Now consider the Bayesian linear model with a zero-mean prior. If we set $\bSigma_0 = \bI$, the posterior mean becomes
$\bbeta_1 = (\bX^\top\bX + \sigma^2 \bI)^{-1} \bX^\top\by$, and the posterior covariance is $\bSigma_1 = (\frac{1}{\sigma^2}\bX^\top\bX+ \bI)^{-1}$. Since $\mathrm{Posterior} = \bbeta\mid \by,\bX, \sigma^2 \sim \normal(\bbeta_1, \bSigma_1)$, the MAP estimate of  $\bbeta$ becomes $\bbeta = \bbeta_1 =  (\bX^\top\bX + \sigma^2 \bI)^{-1} \bX^\top\by$. 
This expression matches the ridge regression estimate when we identify $\sigma^2$ with $\lambda$. 
Therefore, ridge regression can be viewed as a special case of the Bayesian linear model with a zero-mean Gaussian prior.
The Bayesian framework provides a probabilistic interpretation of ridge regression: it corresponds to finding the most probable value of $\bbeta$ given the data and the prior assumptions---namely, that the coefficients are centered at zero with unit variance.

\index{Zeller's $g$-prior}
\subsection{Zeller's $g$-Prior and Variable Transformation}\label{section:gprior-zeromean}

As an illustrative example, consider modeling an individual's weight based on various human characteristics, where one variable in the input matrix $\bX$ represents the person's height in meters.
If this variable is instead expressed in centimeters, the underlying relationship remains unchanged. We can account for this change by simply dividing the corresponding coefficient in $\bbeta$ by 100, effectively converting centimeters back to meters.

More generally, suppose the input matrix $\bX$ undergoes a linear transformation such that $\widetilde{\bX} = \bX\bP$, where $\bP$ is a $p\times p$ invertible matrix; let the corresponding model parameter become $\widetilde{\bbeta}$. Then we have:
$$
\begin{aligned}
\by =  \bX \bbeta = \widetilde{\bX} \widetilde{\bbeta}= \bX\bP\widetilde{\bbeta}.
\end{aligned}
$$
According to the \textit{principle of invariance}, the posterior distributions of $\bbeta$ and $\bP \widetilde{\bbeta}$ should be equivalent---meaning our inference should not depend on how the input features are scaled or transformed.
From earlier results, the posterior distribution of $\bbeta$ given $\bX$ is:
\begin{equation}
\bbeta\mid \by,\bX, \sigma^2 \sim \normal\left(\left(\frac{1}{\sigma^2}\bX^\top\bX + \bSigma_0^{-1}\right)^{-1}\left(\frac{1}{\sigma^2}\bX^\top\by\right), \left(\frac{1}{\sigma^2} \bX^\top\bX + \bSigma_0^{-1}\right)^{-1}\right). \nonumber
\end{equation}
Similarly, for the transformed input matrix $\widetilde{\bX}$, the posterior distribution of $\bP \widetilde{\bbeta}$ becomes: 
\begin{equation}
\bP \widetilde{\bbeta}\mid \by,\widetilde{\bX}, \sigma^2 \sim \normal\left(\textcolor{mylightbluetext}{\bP}\left(\frac{1}{\sigma^2}\widetilde{\bX}^\top\widetilde{\bX} + \widetilde{\bSigma_0}^{-1}\right)^{-1}\left(\frac{1}{\sigma^2}\widetilde{\bX}^\top\by\right), \textcolor{mylightbluetext}{\bP}\left(\frac{1}{\sigma^2} \widetilde{\bX}^\top\widetilde{\bX} + \widetilde{\bSigma_0}^{-1}\right)^{-1}\textcolor{mylightbluetext}{\bP^\top}\right). \nonumber 
~\footnote{Affine transformation of multivariate normal distribution: if we assume that $\rvx\sim \normal(\bmu, \bSigma)$, then $\bA\rvx+\bb \sim \normal(\bA\bmu+\bb, \bA\bSigma\bA^\top)$ for deterministic matrix $\bA$ and vector $\bb$.}
\end{equation}

Following the principle of invariance, it can be shown that this condition holds when the prior covariance matrix takes the form: $\bSigma_0 = k(\bX^\top \bX)^{-1}$, where $k>0$ is a hyper-parameter. A popular specification of $k$ is to relate it to the noise variance $\sigma^2$ by $k=g\sigma^2$. This is called  Zeller's $g$-prior \citep{zellner1986assessing}.
Following the Bayesian linear model with a zero-mean prior, the posterior of $\bbeta$ is
\begin{equation}
\mathrm{Posterior} = \bbeta\mid \by,\bX, \sigma^2 \sim \normal(\bbeta_1, \bSigma_1). \nonumber
\end{equation}
where $\bSigma_1 = \frac{g\sigma^2}{g+1}(\bX^\top\bX )^{-1}$ and $\bbeta_1 =  \frac{g}{g+1}(\bX^\top\bX)^{-1}(\bX^\top\by)$. 

\index{Semi-conjugate prior}
\section{Bayesian Linear Model: Semi-Conjugate Prior Distribution}\label{sec:semiconjugate}

We will use the Gamma distribution  as the prior for the inverse variance (precision) parameter of a  Gaussian distribution. 
A formal definition of the Gamma distribution is provided in Definition~\ref{definition:gamma_distri}.
The choice of the Gamma distribution as the prior for precision is motivated by the rationale provided in \citet{kruschke2014doing}:
\begin{itemize}
\item ``Because of its role in conjugate priors for normal likelihood function, the Gamma distribution is routinely used as a prior for precision (i.e., inverse variance). But there is no logical necessity to do so, and modern Markov chain Monte Carlo (MCMC) methods permit more flexible specification of priors. Indeed, because precision is less intuitive than standard deviation, it can be more useful to give standard deviation a uniform prior that spans a wide range."
\end{itemize}

Building on the setup introduced in Section~\ref{sec:bayesian-zero-mean}, we now treat $\sigma^2$ as an unknown variable rather than a fixed constant. As before, the likelihood function is given by:
\begin{equation}
\mathrm{Likelihood} = \by \mid  \bX, \bbeta, \sigma^2 \sim \normal(\bX\bbeta, \sigma^2\bI). \nonumber
\end{equation}
We define a non-zero mean Gaussian prior on the weight vector $\bbeta$, along with a Gamma prior on the precision parameter $\gamma = 1/\sigma^2$:
\begin{equation}\label{equation:semiconju_blm}
\begin{aligned}
{\color{mylightbluetext}\mathrm{Prior:\,}} &\bbeta \sim \normal({\color{mylightbluetext}\bbeta_0}, \bSigma_0); \\
{\color{mylightbluetext}\mathrm{Hyperprior:\,}}&{\color{mylightbluetext}\gamma = 1/\sigma^2 \sim \gammadist(a_0, b_0)}, 
\end{aligned}
\end{equation}
where the modifications from the previous model are highlighted in blue.
The graphical representation of this Bayesian linear model is shown in Figure~\ref{fig:blm_semiconjugate}.

\begin{figure}[h!]
\centering
\includegraphics[width=0.458\textwidth]{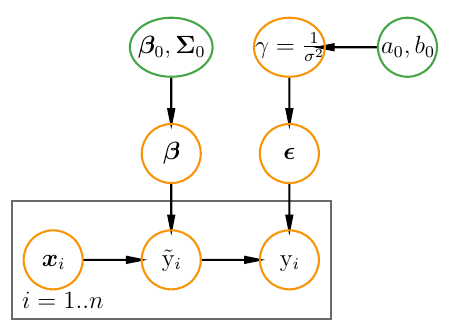}
\caption{Graphical representation of the Bayesian linear model with a semi-conjugate prior. Orange circles represent observed and latent variables, green circles denote prior variables, and plates represent repeated variables. 
The comma ``," in the variable represents ``and." In the graph, $\bbeta \sim \normal(\bbeta_0, \bSigma_0)$, $\gamma = 1/\sigma^2 \sim \gammadist(a_0, b_0)$, $\bepsilon \sim \normal(\bzero, \sigma^2 \bI)$, $\widetilde{\ry}_i = \bx_i^\top\bbeta$, and $\ry_i  = \widetilde{\ry}_i+\epsilon_i$.}
\label{fig:blm_semiconjugate}
\end{figure}

\paragrapharrow{Step 1, conditioning on $\sigma^2$.} Then, given $\sigma^2$, by  Bayes' theorem ``$\mathrm{Posterior} \propto \mathrm{Likelihood} \times \mathrm{Prior} $", we obtain the conditional posterior density of $\bbeta$:
\begin{equation}
\small
\begin{aligned}
&\mathrm{Posterior}
= p(\bbeta\mid \by,\bX, \sigma^2) 
\propto p(\by\mid \bX, \bbeta, \sigma^2) p(\bbeta \mid  \bbeta_0, \bSigma_0) \\
&=  \frac{1}{(2\pi \sigma^2)^{n/2}} \exp\left\{-\frac{1}{2\sigma^2} (\by-\bX\bbeta)^\top(\by-\bX\bbeta)\right\} 
 \frac{1}{(2\pi)^{p/2}\abs{\bSigma_0}^{1/2}}
\exp\left\{-\frac{1}{2} (\bbeta-\bbeta_0)^\top\bSigma_0^{-1}(\bbeta-\bbeta_0)\right\} \\
&\propto \exp\left\{-\frac{1}{2} (\bbeta - \bbeta_2)^\top \bSigma_2^{-1} (\bbeta - \bbeta_2)\right\}, \nonumber
\end{aligned}
\end{equation}
where the parameters are
$$
\begin{aligned}
\bSigma_2 &\triangleq \left(\frac{1}{\sigma^2} \bX^\top\bX + \bSigma_0^{-1}\right)^{-1};\\
\bbeta_2  &\triangleq \bSigma_2 (\bSigma_0^{-1}\bbeta_0+\frac{1}{\sigma^2}\bX^\top\by)= \left(\frac{1}{\sigma^2}\bX^\top\bX + \bSigma_0^{-1}\right)^{-1}\left(\textcolor{mylightbluetext}{\bSigma_0^{-1}\bbeta_0}+\frac{1}{\sigma^2}\bX^\top\by\right).
\end{aligned}
$$
Thus, the conditional posterior distribution is also Gaussian:
\begin{equation}\label{equation:posterior_beta2}
\mathrm{Posterior} = \bbeta\mid \by,\bX, \sigma^2 \sim \normal(\bbeta_2, \bSigma_2). \nonumber
\end{equation}

\paragrapharrow{Connection to the zero-mean prior model.}
We  highlight the relationship between the zero-mean prior model and the semi-conjugate prior model as follows:
\begin{enumerate}
\item  $\bSigma_0$ here is a fixed hyper-parameter.
\item We note that $\bbeta_1$ in Section~\ref{sec:bayesian-zero-mean}  is a special case of $\bbeta_2$ when $\bbeta_0=\bzero$. 
\item And if we assume further $\bX$ has full rank, when $\bSigma_0^{-1} \rightarrow \bzero$, $\bbeta_2$ approaches to $\bbeta_2 \rightarrow \widehat{\bbeta} = (\bX^\top\bX)^{-1}\bX\by$, which converges to the OLS estimate. 
\item  When $\sigma^2 \rightarrow \infty$, $\bbeta_2$ is approximately approaching to $\bbeta_0$, the prior expectation of parameter. However, in the zero-mean prior model, $\sigma^2 \rightarrow \infty$ will causes  $\bbeta_1$ to approach  $\bzero$.
\item \textit{Weighted average interpretation.} We can rewrite $\bbeta_2$ as:
\begin{equation}
\begin{aligned}
\bbeta_2 &=  \left(\frac{1}{\sigma^2}\bX^\top\bX + \bSigma_0^{-1}\right)^{-1}\left(\bSigma_0^{-1}\bbeta_0+\frac{1}{\sigma^2}\bX^\top\by\right) \\
&= \left(\frac{1}{\sigma^2}\bX^\top\bX + \bSigma_0^{-1}\right)^{-1} \bSigma_0^{-1}\bbeta_0 + \left(\frac{1}{\sigma^2}\bX^\top\bX + \bSigma_0^{-1}\right)^{-1} \frac{\bX^\top\bX}{\sigma^2} (\bX^\top\bX)^{-1}\bX^\top\by \\
&=(\bI-\bA)\bbeta_0 + \bA \widehat{\bbeta}, \nonumber
\end{aligned}
\end{equation}
where $\widehat{\bbeta}=(\bX^\top\bX)^{-1}\bX^\top\by$ is the OLS estimate of $\bbeta$, and $\bA\triangleq (\frac{1}{\sigma^2}\bX^\top\bX + \bSigma_0^{-1})^{-1} \frac{\bX^\top\bX}{\sigma^2}$. 
This shows that the posterior mean of  $\bbeta$ is a weighted average of the prior mean and the OLS estimate of $\bbeta$. 
Consequently, if we set the prior parameter $\bbeta_0 = \widehat{\bbeta}$, the posterior mean of $\bbeta$ becomes exactly $\widehat{\bbeta}$.
\end{enumerate}

\paragrapharrow{Step 2, conditioning  on $\bbeta$.} Given $\bbeta$, we again apply Bayes' theorem to obtain the posterior distribution of the precision parameter $\gamma$:
\begin{equation}
\begin{aligned}
\mathrm{Posterior}
&= p(\gamma=\frac{1}{\sigma^2}\mid \by,\bX, \bbeta) 
\propto p(\by\mid \bX, \bbeta, \gamma) p(\gamma \mid  a_0, b_0) \\
&=  \frac{\gamma^{n/2}}{(2\pi )^{n/2}} \exp\left\{-\frac{\gamma}{2} (\by-\bX\bbeta)^\top(\by-\bX\bbeta)\right\}  \times \frac{{b_0}^{a_0}}{\Gamma(a_0)} \gamma^{a_0-1} \exp(-b_0 \gamma) \\
&\propto \gamma(a_0+\frac{n}{2}-1) \exp\left\{-\gamma\left[b_0+\frac{1}{2}(\by-\bX\bbeta)^\top(\by-\bX\bbeta)\right]\right\}. \nonumber
\end{aligned}
\end{equation}
Therefore, the conditional posterior of $\gamma$ follows a Gamma distribution:
\begin{equation}\label{equation:posterior_gamma_sigma2}
\mathrm{posterior\,\, of\,\,} \gamma \mathrm{\,\,given\,\,} \bbeta  = \gamma\mid \by,\bX, \bbeta \sim \gammadist\left(a_0+\frac{n}{2}, \big[b_0+\frac{1}{2}(\by-\bX\bbeta)^\top(\by-\bX\bbeta)\big]\right). 
\end{equation}

\paragrapharrow{Prior information on the noise.}
We can interpret the prior on $\gamma$ intuitively as follows:
\begin{enumerate}
\item We notice that the prior mean and posterior mean of $\gamma$ are $\Exp[\gamma]=\frac{a_0}{b_0}$ and $\Exp[\gamma \mid \bbeta]=\frac{a_0 + \frac{n}{2}}{b_0 +\frac{1}{2}(\by-\bX\bbeta)^\top(\by-\bX\bbeta)}$, respectively. 
This suggests that the internal meaning of $2 a_0$ is the effective sample size of the prior information about the noise variance $\sigma^2 = \frac{1}{\gamma}$. 

\item As we assume $\rvy=\bX\bbeta +\bepsilon$, where $\bepsilon \sim \normal(\bzero, \sigma^2\bI)$, then $\frac{(\rvy-\bX\bbeta)^\top(\rvy-\bX\bbeta)}{\sigma^2} \sim \chi_{(n)}^2$ and $\Exp[\frac{1}{2}(\by-\bX\bbeta)^\top(\by-\bX\bbeta)] = \frac{n}{2}\sigma^2$. So the latent meaning of $\frac{b_0}{a_0}$ is the prior variance of the noise.

\item Some textbooks explicitly express $\gamma \sim  \gammadist(n_0/2, n_0\sigma_0^2/2)$  (in which case, $n_0$ is the prior sample size, and $\sigma_0^2$ is the prior variance). While this form may seem arbitrary at first glance, it provides an intuitive way to encode prior beliefs about the magnitude and uncertainty of the noise variance.
\end{enumerate}

\index{Gibbs sampling}
\index{Gibbs sampler}
\subsection{Gibbs Sampling with Two Variables}
Gibbs sampling was first introduced by Turchin \citep{turchin1971computation}, and later reintroduced by the Geman brothers in the context of image restoration \citep{geman1984stochastic}. 
The Geman brothers named the algorithm after the physicist J. W. Gibbs, some eight decades after his death, in reference to an analogy between the sampling algorithm and statistical physics.

Gibbs sampling is particularly useful when the joint distribution is not explicitly known or is difficult to sample from directly, but the conditional distributions of each variable are known and easy to sample from. A Gibbs sampler iteratively generates a sample for each parameter or variable, conditioned on the current values of all other parameters or variables. Therefore, it operates in a componentwise manner.

For example, given some data $\mathcalX$ and a probability distribution $p(\bbeta \mid  \mathcalX, \balpha)$, parameterized by $\bbeta = \{\beta_1, \beta_2, \ldots, \beta_p\}$. 
In this case, we can sequentially draw samples from the full conditional distributions:
\begin{equation}
\beta_i^{(t)} \sim p(\beta_i \mid  \bbeta_{-i}^{(t-1)}, \mathcalX, \balpha),
\end{equation}
where $\bbeta_{-i}^{(t-1)}$ is all current values of $\bbeta$ in $(t-1)$-th iteration except for $\beta_i$. 
If the sampling continues long enough, the resulting values of $\beta_i$ will approximate random samples from the target (posterior) distribution $p(\bbeta\mid \mathcalX, \balpha)$

In deriving a Gibbs sampler, it is often helpful to observe that
\begin{equation}
p(\beta_i \mid  \bbeta_{- i}, \mathcalX)
= \frac{
p(\beta_1, \beta_2, \ldots,\beta_p, \mathcalX)
}{
p(\bbeta_{- i}, \mathcalX)
} \propto p(\beta_1, \beta_2, \ldots,\beta_p, \mathcalX).
\end{equation}
That is, the conditional distribution is proportional to the joint distribution. 
This observation allows us to simplify computations by ignoring constant terms in the joint distribution that do not involve the parameter being sampled, e.g., we can discard the terms w.r.t. to $\bbeta_{- i}$ and only pay attention to $\beta_i$ when we want to sample from $p(\beta_i \mid  \bbeta_{- i}, \mathcalX)$.

%
%
%
%

To illustrate briefly, suppose we have a \textit{bivariate} joint distribution $p(\beta_1,\beta_2\mid \mathcalX)$. A Gibbs sampler would iteratively draw samples from the two full conditionals: first from $p(\beta_1\mid \beta_2,\mathcalX)$, then from $p(\beta_2\mid \beta_1,\mathcalX)$.
This iterative procedure generates a sequence of realizations for $\beta_1$ and $\beta_2$:
\begin{equation}
(\beta_1^0, \beta_2^0), \, (\beta_1^1, \beta_2^1), \, (\beta_1^2, \beta_2^2),\,  \ldots \nonumber
\end{equation}
which converges in distribution to the joint distribution $p(\beta_1, \beta_2 \mid \mathcalX)$. 
For further reading on Gibbs sampling, see \citet{turchin1971computation, geman1984stochastic, muller2004nonparametric, rencher2008linear, hoff2009first, gelman2013bayesian, kruschke2018bayesian}.

Using this Gibbs sampling technique, we can construct a Gibbs sampler for the Bayesian linear model with a semi-conjugate prior discussed in Section~\ref{sec:semiconjugate}. The steps are as follows:
\begin{enumerate}[(1).]
\item  Set initial values to $\bbeta$ and $\gamma = \frac{1}{\sigma^2}$.
\item Update $\bbeta$: $\mathrm{Posterior} = \bbeta\mid \by,\bX, \gamma \sim \normal(\bbeta_2, \bSigma_2)$.
\item Update $\gamma$: $\mathrm{Posterior}  = \gamma\mid \by,\bX, \bbeta \sim \gammadist\left(a_0+\frac{n}{2}, [b_0+\frac{1}{2}(\by-\bX\bbeta)^\top(\by-\bX\bbeta)]\right)$.
\end{enumerate}

\index{Principle of invariance}
\index{Zeller's $g$-prior}
\subsection{Zeller's $g$-Prior}
Similar to the variable transformation problem discussed in  Section~\ref{section:gprior-zeromean}, suppose the input matrix $\bX$ is transferred as $\widetilde{\bX} = \bX\bP$ given some $p\times p$ nonsingular matrix $\bP$, in which case, the model parameter is $\widetilde{\bbeta}$. Then, we have 
$$
\by =  \bX \bbeta = \widetilde{\bX} \widetilde{\bbeta}= \bX\bP\widetilde{\bbeta}.
$$
According to the principle of invariance, the posterior distributions of $\bbeta$ and $\bP \widetilde{\bbeta}$ should be identical.
Simple calculation can show that this condition is met if $\bbeta_0=\bzero, \bSigma_0 = g\sigma^2(\bX^\top \bX)^{-1}$.
Following the Bayesian linear model with a semi-conjugate prior, the posterior of $\bbeta$ is
\begin{equation}\label{equation:posterior_beta2_gprior}
\mathrm{Posterior} = \bbeta\mid \by,\bX, \sigma^2 \sim \normal(\bbeta_2, \bSigma_2). \nonumber
\end{equation}
where $\bSigma_2 = \frac{g\sigma^2}{g+1}(\bX^\top\bX )^{-1}$ and $\bbeta_2 =  \frac{g}{g+1}(\bX^\top\bX)^{-1}(\bX^\top\by)$.

\subsection*{\textbf{Derivation of $p(\by \mid   \bX,\sigma^2)$}}
Under the $g$-prior specified above, we now derive the conditional distribution $p(\by \mid   \bX,\sigma^2)$, which will be very useful for the Bayesian variable selection procedure.
Since $\rvy \mid   \bX,\bbeta, \sigma^2 \sim \normal(\bX\bbeta, \sigma^2\bI)$ and $\bbeta \mid  \bX,\sigma^2 \sim \normal(\bbeta_0, \bSigma_0)$, we have 
$$
\small
\begin{aligned}
&p(\by, \bbeta \mid   \bX,\sigma^2)
=p(\by \mid   \bX,\bbeta, \sigma^2) p(\bbeta \mid  \bX, \sigma^2)  \\
&=\frac{1}{(2\pi \sigma^2)^{\frac{n}{2}}} \exp\left\{-\frac{1}{2\sigma^2} (\by-\bX\bbeta)^\top(\by-\bX\bbeta)\right\}
 \times \frac{1}{(2\pi)^{\frac{p}{2}}\abs{\bSigma_0}^{1/2}}\exp\left\{-\frac{1}{2} \bbeta^\top\bSigma_0^{-1}\bbeta\right\}   \\
&= \frac{1}{(2\pi \sigma^2)^{\frac{n}{2}}} \exp\left\{-\frac{\by^\top\by}{2\sigma^2} +\frac{\bbeta_2^\top \bSigma_2^{-1}\bbeta_2}{2}\right\}
\frac{\abs{\bSigma_2}^{1/2}}{\abs{\bSigma_0}^{1/2}}
 \frac{1}{(2\pi)^{\frac{p}{2}}\abs{\bSigma_2}^{1/2}}\exp\left\{-\frac{(\bbeta-\bbeta_2)^\top\bSigma_2^{-1}(\bbeta-\bbeta_2)}{2}\right\},
\end{aligned}
$$
where the parameter $\bbeta$ only appears in the third term, which corresponds to a multivariate normal distribution with mean $\bbeta_2$ and covariance $\bSigma_2$ (defined previously). Since this term integrates to 1 over all $\bbeta$, we can proceed to integrate it out. 
Since $\bSigma_2 = \frac{1}{g+1}\bSigma_0$ such that $\frac{\abs{\bSigma_2}^{1/2}}{\abs{\bSigma_0}^{1/2}} = \frac{1}{(g+1)^{p/2}}$. Therefore, we obtain 
\begin{equation}\label{equation:gprior-yxsigma}
\begin{aligned}
p(\by \mid   \bX,\sigma^2) 
&= \int p(\by, \bbeta \mid   \bX,\sigma^2) d\bbeta
=  \int p(\by \mid   \bX,\bbeta, \sigma^2) p(\bbeta \mid  \bX, \sigma^2) d\bbeta\\
&=\frac{1}{(2\pi \sigma^2)^{n/2}}\exp\left\{-\frac{1}{2\sigma^2}\by^\top\by\right\}\cdot
\frac{1}{(g+1)^{p/2}}\exp\left\{\frac{1}{2}\bbeta_2^\top \bSigma_2^{-1}\bbeta_2 \right\}\\
&=\frac{1}{(2\pi \sigma^2)^{n/2}}\frac{1}{(g+1)^{p/2}}
\exp\left\{-\frac{r}{2\sigma^2} \right\},
\end{aligned}
\end{equation}
where $r \triangleq \by^\top\by -\by^\top (\frac{g}{g+1}\bX(\bX^\top\bX)^{-1}\bX^\top)\by$.

\index{Bayesian variable selection}
\index{Variable selection}
\subsection{Bayesian Variable Selection}
In Section~\ref{section:variable-selection}, we introduced variable selection using the $F$-test. 
An alternative approach can be achieved through the \textit{Bayesian variable selection} procedure.
\subsection*{\textbf{The Model}} \index{Mask vector}
Let $\bz=[z_1, z_2, \ldots, z_p] \in \real^p$ be a mask vector, where each component $z_j \in \{0,1\}$ for all $j \in \{1, 2, \ldots, p\}$. For each regression coefficient  $\beta_j$ in $\bbeta$, we set $\beta_j = z_j \times b_j$, where $b_j$ represents the original coefficient, and $\beta_j$ is the final coefficient that may be included or excluded from the model based on the value of $z_j$. 
In matrix form, this relationship can be written as:
\begin{equation}
\bbeta = \bz \odot\bb, 
\end{equation}
where $\odot$ denotes the Hadamard (element-wise) product. 
Then, the model with noise disturbance can be expressed as 
\begin{equation}
\rvy = \bX\bbeta +\bepsilon= \bX(\bz \odot\bb) +\bepsilon,
\end{equation}
where $\bepsilon \sim \normal(\bzero, \sigma^2\bI)$. 

In Bayesian variable selection, the goal is to estimate the posterior distribution of the mask vector $\bz$. 
That is, we aim to determine which variables are most likely to be included in the model given the observed data.
By Bayes' theorem, the posterior distribution of $\bz$ is proportional to the product of the prior and the likelihood:
\begin{equation}
p(\bz \mid  \by, \bX) \propto p(\bz) p(\by \mid  \bX, \bz).
\end{equation}
Alternatively, suppose we have two mask vectors $\bz_a$ and $\bz_b$. 
The ratio of their posterior probabilities is given by:
\begin{equation}
\begin{aligned}
\text{odds}(\bz_a, \bz_b \mid  \by, \bX) &= \frac{p(\bz_a \mid  \by, \bX)}{p(\bz_b \mid  \by, \bX)} &=&\,\,\,\, \frac{p(\bz_a)}{p(\bz_b)} &\times& \,\,\,\,\,\frac{p(\by \mid  \bX, \bz_a)}{p(\by \mid  \bX, \bz_b)} \\
&\text{Posterior odds} &=&\,\,\,\, \text{Prior odds} &\times&\,\,\,\,\, \text{Bayes factor}
\end{aligned}
\end{equation}
where the \textit{Bayes factor} 
quantifies how much the observed data favor the model associated with $\bz_a$ over the model associated with  $\bz_b$.

\subsection*{\textbf{Derivation of the Bayes Factor}}
We begin by writing out the marginal likelihood related to the Bayes factor:
\begin{equation}\label{equation:bae_der_fac}
\begin{aligned}
p(\by \mid  \bX, \bz) &= \int \int p(\by, \bbeta, \sigma^2 \mid  \bX, \bz) d\bbeta d\sigma^2 \\
&=\int \left(\int p(\by, \bbeta,  \mid  \bX, \bz, \sigma^2)d\bbeta\right)  p(\sigma^2)  d\sigma^2\\
&=\int p(\by \mid  \bX, \sigma^2, \bz)  p(\sigma^2)  d\sigma^2,
\end{aligned}
\end{equation}
where $p(\by \mid  \bX, \sigma^2, \bz)=\left(\int p(\by, \bbeta,  \mid  \bX, \bz, \sigma^2)d\bbeta\right)$ can be obtained from Equation~\eqref{equation:gprior-yxsigma} (under  Zeller's $g$-prior) by substituting $\bX$ by $\bX_z$, where we remove the variable $i$ if $z_i$=0.

We realize that $\gamma = \frac{1}{\sigma^2}$, and 
$$ p(\sigma^2) = p(\gamma \mid  a_0, b_0) =\frac{{b_0}^{a_0}}{\Gamma(a_0)} \gamma^{a_0-1} \exp(-b_0 \gamma).$$
Then, 
$$
\begin{aligned}
p(\by \mid  \bX, \sigma^2, \bz)  p(\sigma^2)&= \frac{1}{(2\pi \sigma^2)^{n/2}}\frac{1}{(g+1)^{\textcolor{mylightbluetext}{p_z}/2}}
\exp\left(-\frac{\textcolor{mylightbluetext}{r_z}}{2\sigma^2}  \right)\cdot \frac{{b_0}^{a_0}}{\Gamma(a_0)} \gamma^{a_0-1} \exp(-b_0 \gamma)\\
&= \frac{1}{(2\pi )^{n/2}}\frac{1}{(g+1)^{\textcolor{mylightbluetext}{p_z}/2}}
\frac{{b_0}^{a_0}}{\Gamma(a_0)} \gamma^{a_0+n/2-1} \exp\left(-(b_0+\frac{\textcolor{mylightbluetext}{r_z}}{2}) \gamma\right)\\
&= \frac{1}{(2\pi )^{n/2}}\frac{1}{(g+1)^{\textcolor{mylightbluetext}{p_z}/2}}
\frac{{b_0}^{a_0}}{\Gamma(a_0)}\frac{\Gamma(a_n)}{{b_n}^{a_n}} \cdot   \frac{{b_n}^{a_n}}{\Gamma(a_n)} \gamma^{a_n-1} \exp\left(-b_n \gamma\right)\\
&= \frac{1}{(2\pi )^{n/2}}\frac{1}{(g+1)^{\textcolor{mylightbluetext}{p_z}/2}}
\frac{{b_0}^{a_0}}{\Gamma(a_0)}\frac{\Gamma(a_n)}{{b_n}^{a_n}} \cdot   \gammadist(\gamma \mid  a_n,b_n),
\end{aligned}
$$
where $r_z=\by^\top\by -\by^\top (\frac{g}{g+1}\bX_z(\bX_z^\top\bX_z)^{-1}\bX_z^\top)\by$, $p_z$ is the number of 1's in $\bz$, $a_n = a_0+n/2$, $b_n= b_0 + \frac{r_z}{2}$, and $\gammadist(\gamma \mid  a_n,b_n)$ is the probability density function of the Gamma distribution with respect to $\gamma$ with parameters $a_n$ and $b_n$. Since $\gamma$ only appears in the last term $\gammadist(\gamma \mid  a_n,b_n)$, which integrates to 1; from \eqref{equation:bae_der_fac}, we have 
$$
p(\by \mid  \bX, \bz) = \frac{1}{(2\pi )^{n/2}}\frac{1}{(g+1)^{p_z/2}}
\frac{{b_0}^{a_0}}{\Gamma(a_0)}\frac{\Gamma(a_n)}{{b_n}^{a_n}}.
$$
\subsection*{\textbf{Same Prior Hyper-parameter}}
Similarly, under models $\bz_a$ and $\bz_b$, where we assume they share the same parameters of $a_0$ and $b_0$ in the two models, we have
$$
\frac{p(\by \mid  \bX, \bz_a)}{p(\by \mid  \bX, \bz_b)} = \frac{(g+1)^{\textcolor{mylightbluetext}{p_{z_b}}/2}}{(g+1)^{\textcolor{mylightbluetext}{p_{z_a}}/2}}
\left(\frac{2b_0+\textcolor{mylightbluetext}{r_{z_b}}}{2b_0+\textcolor{mylightbluetext}{r_{z_a}}}\right)^{a_0+\frac{n}{2}},
$$
where $p_{z_a}$ is the number of variables selected in  model $\bz_a$, and $p_{z_b}$ is the number of variables selected in  model $\bz_b$.

\subsection*{\textbf{Different Prior Hyper-parameter}}
We have previously mentioned that 
\begin{enumerate}[(1).]
\item  $2 a_0$ is the prior sample size for the noise $\sigma^2 = \frac{1}{\gamma}$. 
\item   $\frac{b_0}{a_0}$ is the prior variance of the noise.
\end{enumerate}
Suppose now, given the two models $\bz_a$ and $\bz_b$, we assume $2 a_0=1$ for both of the models (i.e., prior sample sizes for the noise are both 1), and set the $\frac{b_0}{a_0}$ to be the estimated residual variance under the least squares estimate for each model, say maximum likelihood estimators:
$$
\widehat{\sigma}^2_{z_a} = \frac{1}{n}\normtwo{\by-\bX_{z_a}\widehat{\bbeta}_{z_a}}^2
\qquad\text{and}\qquad 
\widehat{\sigma}^2_{z_b} = \frac{1}{n}\normtwo{\by-\bX_{z_b}\widehat{\bbeta}_{z_b}}^2,
$$
which are biased estimators for $\sigma^2$; see Section~\ref{section:mle-gaussian}. 
Alternatively, we could choose the unbiased estimators, which are divided by $n-p_{z_a}$ and $n-p_{z_b}$, respectively, rather than divided by $n$ (see Section~\ref{sec:dist_sse}). Then, we have 
\begin{equation}\label{equation:bayes-variable-sec-final}
\frac{p(\by \mid  \bX, \bz_a)}{p(\by \mid  \bX, \bz_b)} = \frac{(g+1)^{\textcolor{mylightbluetext}{p_{z_b}}/2}}{(g+1)^{\textcolor{mylightbluetext}{p_{z_a}}/2}}
\left(\frac{\textcolor{mylightbluetext}{\widehat{\sigma}^2_{z_a}}}{\textcolor{mylightbluetext}{\widehat{\sigma}^2_{z_b}}}\right)^{\frac{1}{2}}
\left(\frac{\textcolor{mylightbluetext}{\widehat{\sigma}^2_{z_b}}+\textcolor{mylightbluetext}{r_{z_b}}}{\textcolor{mylightbluetext}{\widehat{\sigma}^2_{z_a}}+\textcolor{mylightbluetext}{r_{z_a}}}\right)^{\frac{n+1}{2}}.
\end{equation}
Notice that the ratio of the marginal probabilities is essentially \textbf{a balance between the model complexity and goodness-of-fit}:
\begin{itemize}
\item A larger value of $p_{z_b}$ means the model $\bz_b$ has more selected variables (more complexity), which will make the ratio larger and penalize model $\bz_b$.

\item  However, a more complex model will make $r_{z_b}$ smaller, which in turn will make the ratio smaller and penalize model $\bz_a$.
\end{itemize}

\subsection*{\textbf{Gibbs Sampler}}
Given a current value $\bz=[z_1, z_2, \ldots, z_p]^\top$, a new value of the $j$-th variable $z_j$ is generated by sampling from $p(z_j \mid  \by, \bX, \bz_{-j})$, where $\bz_{-j}$ refers to the values of $\bz$ except the $j$-th element $z_j$. Specifically, we define the \textit{intermediate parameter}~\footnote{This intermediate parameter is quite useful in other contexts, e.g., Bayesian inference for interpolative decomposition \citep{lu2022bayesian, lu2022comparative}.}
$$
o_j =  \frac{p(z_j = 1 \mid  \by, \bX, \bz_{-j})}{p(z_j = 0 \mid  \by, \bX, \bz_{-j})} = \frac{p(z_j = 1)}{p(z_j = 0)} \times \frac{p(\by \mid  \bX, \bz_{-j}, z_j=1)}{p(\by \mid  \bX, \bz_{-j}, z_j=0)},
$$
where the last term can be obtained using Equation~\eqref{equation:bayes-variable-sec-final}. For simplicity, we can assume a uniform prior on $p(z_j)$: 
$$
p(z_j = 1)  =p(z_j = 0) = 0.5.
$$ 
Then, using this intermediate parameter, the full conditional probability of $z_j$ being equal to 1 can be obtained by 
\begin{equation}\label{equation:pzj1}
p(z_j = 1 \mid  \by, \bX, \bz_{-j})=\frac{o_j}{1+o_j}.
\end{equation}

Therefore, given the value of $\bz^{(k)}$ at the $k$-th step, we can generate the next set of values  $\{\bz^{(k+1)}, \gamma^{(k+1)}, \bbeta^{(k+1)}\}$ using the following steps:
\begin{enumerate}[(1).]
\item  Set initial values to $\bbeta$, $\gamma = \frac{1}{\sigma^2}$, and $\bz$ if $k$=1;

\item Update $\bz$: For $j\in \{1, 2, \ldots, p\}$ in random order, replace $z_j$ with a sample from $p(z_j = 1 \mid  \by, \bX, \bz_{-j})$ (Equation~\eqref{equation:pzj1});

\item Update $\bbeta$: $\bbeta\mid \by,\bX, \gamma, \bz \sim \normal(\bbeta_2, \bSigma_2)$, where $\bSigma_2 = \frac{g\sigma^2}{g+1}(\bX_z^\top\bX_z )^{-1}$ and $\bbeta_2 =  \frac{g}{g+1}(\bX_z^\top\bX_z)^{-1}(\bX_z^\top\by)$ (Equation~\eqref{equation:posterior_beta2_gprior});

\item Update $\gamma$: $\gamma\mid \by,\bX, \bbeta, \bz \sim \gammadist\left(a_0+\frac{n}{2}, [b_0+\frac{1}{2}\normtwo{\by-\bX_z\bbeta_z}^2]\right)$ (Equation~\eqref{equation:posterior_gamma_sigma2}).
\end{enumerate}

\begin{figure}[htp]
\centering  
\vspace{-0.55cm} 
\subfigtopskip=2pt 
\subfigbottomskip=2pt 
\subfigcapskip=-5pt 
\subfigure[Contour plot of normal-inverse-Gamma density by varying parameter $r$ (\textcolor{brightlavender}{purple}=low, \textcolor{mydarkyellow}{yellow}=high). ]{\label{fig:dists_normalinversegamma_varyingR}
\includegraphics[width=0.955\linewidth]{./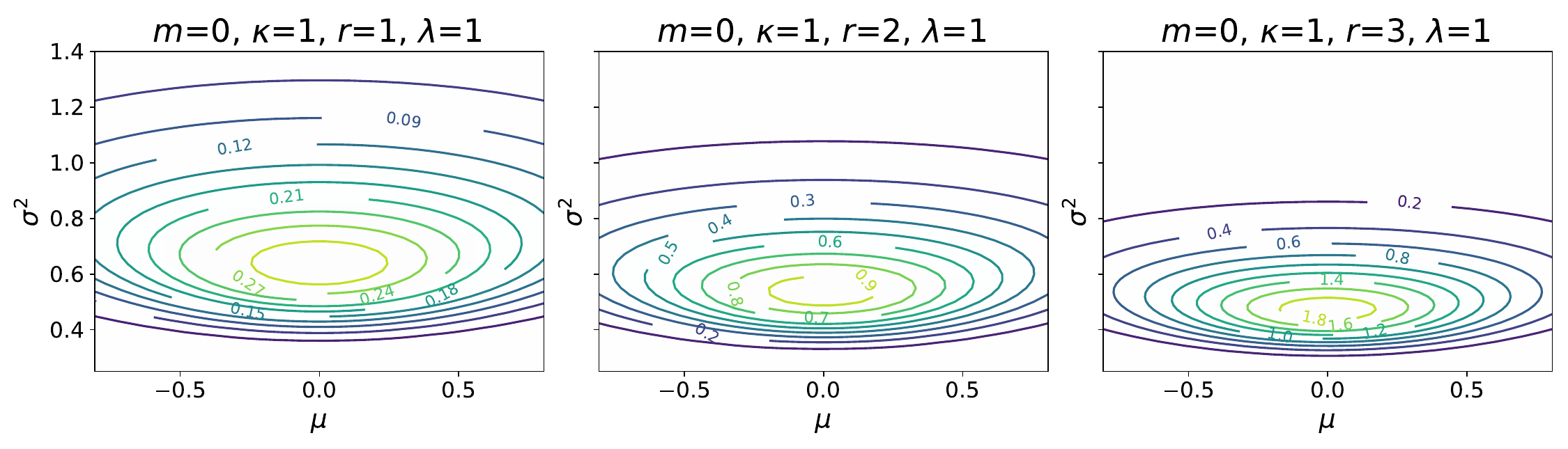}}
\subfigure[Contour plot of normal-inverse-Gamma density by varying parameter $\lambda$ (\textcolor{brightlavender}{purple}=low, \textcolor{mydarkyellow}{yellow}=high).]{\label{fig:dists_normalinversegamma_varyingLmabda}
\includegraphics[width=0.955\linewidth]{./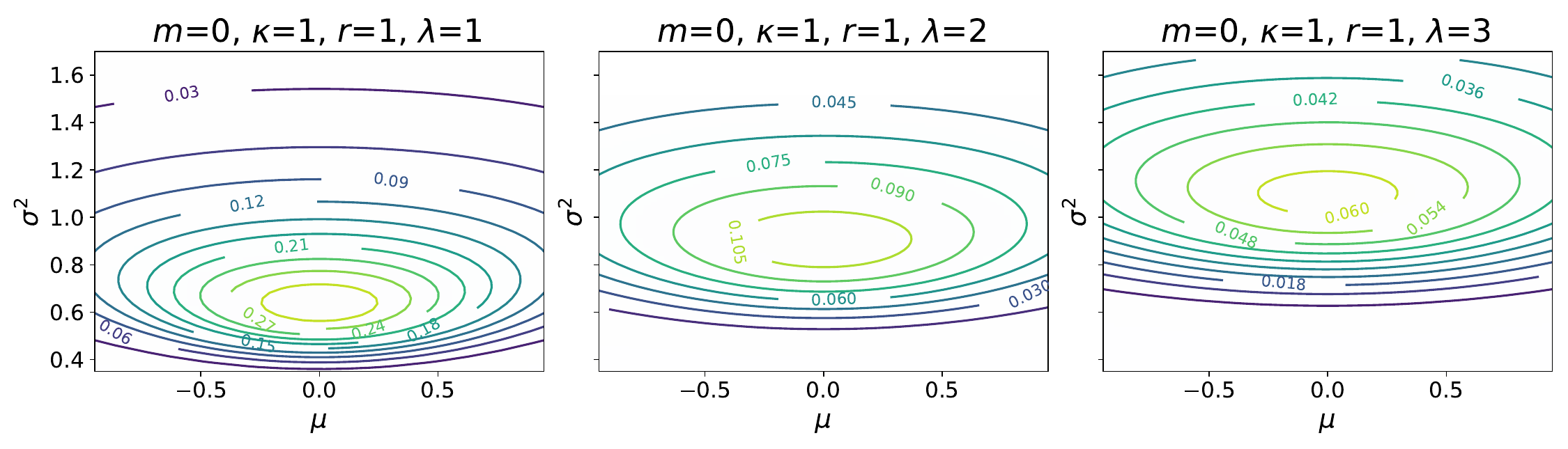}}
\subfigure[Contour plot of normal-inverse-Gamma density by varying parameter $\kappa$ (\textcolor{brightlavender}{purple}=low, \textcolor{mydarkyellow}{yellow}=high).]{\label{fig:dists_normalinversegamma_varyingKappa}
\includegraphics[width=0.955\linewidth]{./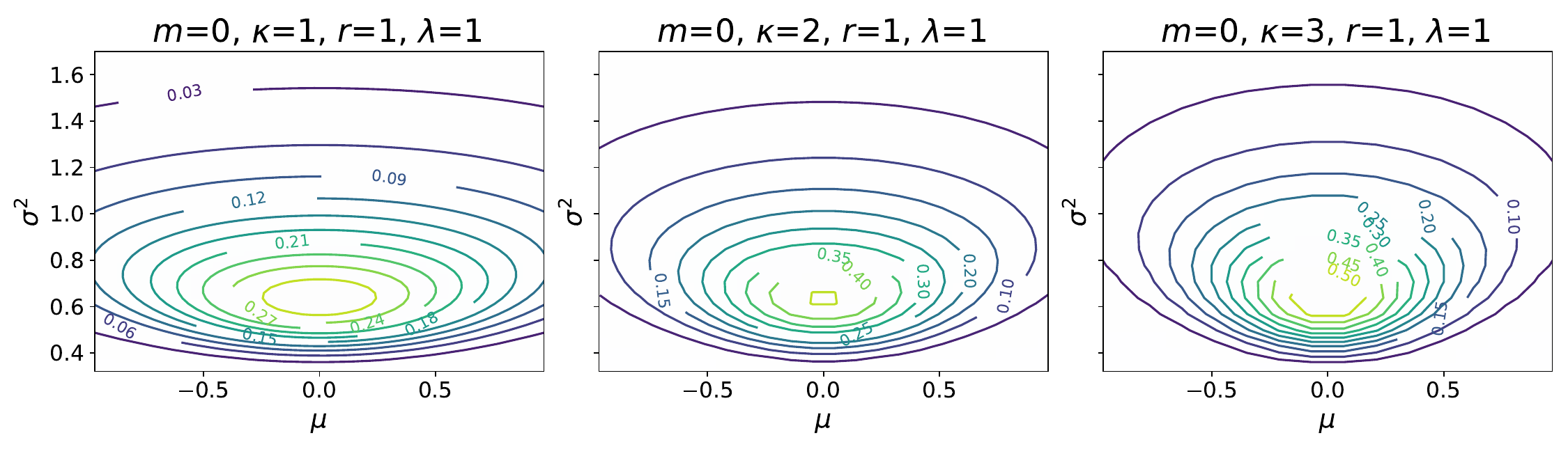}}
\subfigure[Contour plot of normal-inverse-Gamma density by varying parameter $m$ (\textcolor{brightlavender}{purple}=low, \textcolor{mydarkyellow}{yellow}=high).]{\label{fig:dists_normalinversegamma_varyingaM}
\includegraphics[width=0.955\linewidth]{./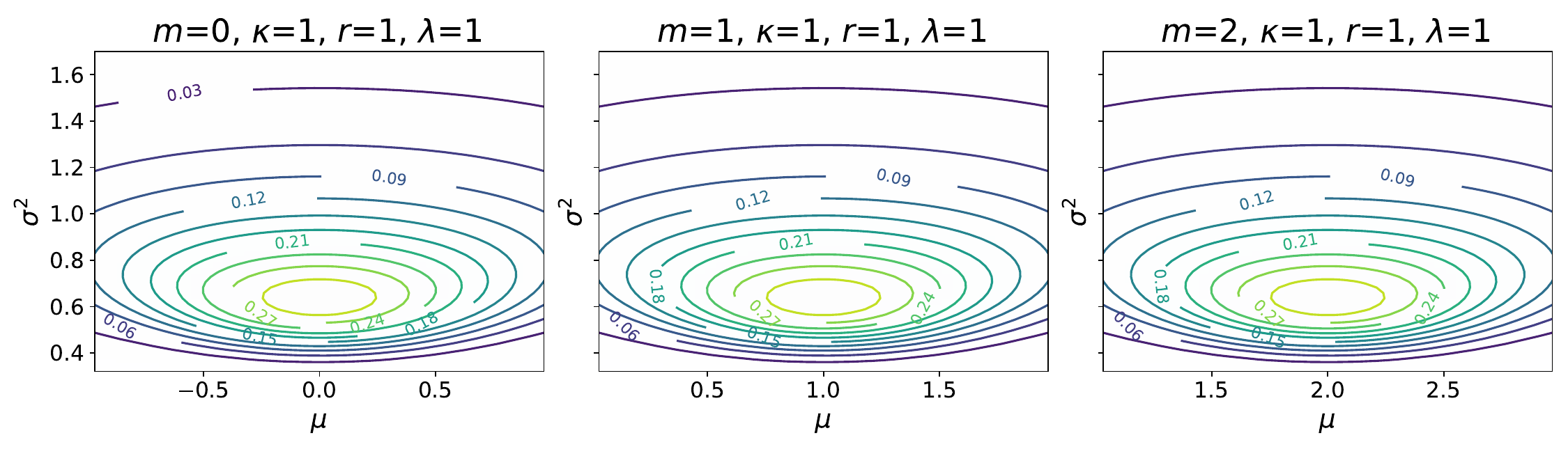}}
\caption{Normal-inverse-Gamma probability density functions by varying different parameters.}
\label{fig:dists_normalinversegamma_s}
\end{figure}

\index{Full conjugate prior}
\section{Bayesian Linear Model: Full Conjugate Prior}

\subsection{Normal-Inverse-Gamma Distribution}\label{section:niv_dist}

\index{Normal-inverse-Gamma distribution}
As we have seen that the Gamma density is a  conjugate prior for the precision parameter of a Gaussian distribution. The \textit{normal-inverse-Gamma (NIG)} distribution defined as follows is a joint conjugate prior for the mean and variance parameters of a Gaussian distribution.
\begin{definition}[Normal-Inverse-Gamma (NIG) Distribution]\label{definition:normal_inverse_gamma}
The joint density of  normal-inverse-Gamma distribution is a  density defined as
\begin{equation}
\begin{aligned}
&\gap \nig (\smu, \sigma^2 \mid m, \kappa, r, \lambda) 
= \mathcal{N} (\mu\mid m, \frac{\ssigma^2}{\kappa})  \cdot \inversegammadist (\ssigma^2 \mid r,  \lambda) \\
&=\frac{1}{Z_{\nig}(\kappa, r, \lambda)}  (\ssigma^2)^{-\frac{2r +3}{2}} \exp\left\{-\frac{1}{2 \ssigma^2}\left[\kappa(m-\mu)^2 + 2\lambda \right] \right\},  \\
\end{aligned}
\label{equation:uni_gaussian_prior-nig}
\end{equation}
where $\sigma^2, r, \lambda>0$, and $Z_{\nig}(\kappa, r, \lambda)$ is a normalizing constant:
\begin{equation}\label{equation:uni_gaussian_giw_constant-nig}
Z_{\nig}(\kappa, r, \lambda) = \frac{\Gamma(r)}{\lambda^{r}} \sqrt{\frac{2\pi}{\kappa}}.
\end{equation}
Figure~\ref{fig:dists_normalinversegamma_s} illustrates several probability density functions of the normal-inverse-Gamma  distribution, obtained by varying different parameter values.
\end{definition}

The normal-inverse-Gamma defines a conjugate  prior over the mean and variance parameters of a Gaussian distribution.
When the variance and mean parameters of the Gaussian distribution are not fixed with $n$ data points $\mathcalX=\{x_1, x_2, \ldots, x_n\}$ drawn i.i.d. from a normal distribution with mean $\mu$ and variance $\sigma^2$.
The normal-inverse-Gamma $\normalinversegamma(m_0, \kappa_0, r_0, \lambda_0)$ with $m_0\in\real$ and $r_0, \lambda_0, \kappa_0\in\real_+$ is a joint distribution on $\mu, \sigma^2$ by letting 
$$
\begin{aligned}
\mu \mid \sigma^2 &\sim \normal(m_0, \frac{\sigma^2}{\kappa_0})
\qquad \text{and}\qquad 
\sigma^2 \sim \inversegammadist(r_0, \lambda_0)
\end{aligned}
$$
With this prior, $\mu$ and $\sigma^2$ decouple, and the posterior conditional densities of $\mu$ and $\sigma^2$ are Gaussian and inverse-Gamma, respectively.
The joint p.d.f of NIG prior can be expressed as
$$
p(\mu, \sigma^2) = \normal(m_0, \frac{\sigma^2}{\kappa_0}) \cdot \inversegammadist(r_0, \lambda_0)
= \normalinversegamma(\mu, \sigma^2 \mid m_0, \kappa_0, r_0, \lambda_0).
$$

Again, by  Bayes' theorem ``$\mathrm{Posterior} \propto \mathrm{Likelihood} \times \mathrm{Prior} $," the  posterior of the $\mu$ and $\sigma^2$ parameters of a Gaussian distribution under the NIG prior is
\begin{equation}\label{equation:conjugate_nigamma_general}
\begin{aligned}
&\gap p(\mu, \sigma^2\mid \mathcalX, \bbeta ) \\
&\propto \normal(\mathcalX \mid \mu, \sigma^2)
\cdot  \normalinversegamma(\mu, \sigma^2 \mid \bbeta)
\propto \prod_{i=1}^{n}\normal(x_i\mid \mu, \sigma^2)
\cdot \normalinversegamma(\mu, \sigma^2 \mid  m_0, \kappa_0, r_0, \lambda_0)\\
&\stackrel{\dag}{=} \frac{C}{(\sigma^2)^{\frac{2r_0 + 3+n}{2}}} \exp\left\{-\frac{\left[  n(\overline{x} - \smu)^2 +  n S_{\overline{x}} \right]}{2 \ssigma^2}   \right\} 
\exp\left\{ -\frac{\left[2\lambda_0 + \kappa_0(m_0-\mu)^2\right]}{2 \ssigma^2}  \right\}\\
&\propto (\sigma^2)^{-\frac{2r_n + 3}{2}}\exp\left\{ -\frac{1}{2 \ssigma^2} \left[ \lambda_n + \kappa_n(m_n-\mu)^2\right] \right\}
\propto\nig(\mu, \sigma^2 \mid m_n, \kappa_n, r_n, \lambda_n),
\end{aligned}
\end{equation}
where $\bbeta\triangleq\{m_0, \kappa_0, r_0, \lambda_0\}$, $C\triangleq\frac{(2\pi)^{-n/2}}{Z_{\nig}(\kappa_0, r_0, \lambda_0)}$, the equality $(\dag)$ follows from \eqref{equation:uni_gaussian_likelihood}, and 
$$
\begin{aligned}
m_n &\triangleq\frac{\kappa_0 m_0 + n\overline{x}}{\kappa_n} = \frac{\kappa_0 }{\kappa_n}m_0 + \frac{n}{\kappa_n}\overline{x};
\qquad &\kappa_n&\triangleq \kappa_{0} +n;\\
r_n &\triangleq r_0 +\frac{n}{2};
\qquad &\lambda_n &\triangleq\lambda_0 +\frac{1}{2}(n S_{\overline{x}} + n\overline{x}^2 + \kappa_{0} m_0^2 -\kappa_nm_n^2)\\
&&&= \lambda_0+\frac{1}{2}\left(n S_{\overline{x}} + \frac{\kappa_0 n }{\kappa_{0}+n} (\overline{x} - m_0)^2\right),
\end{aligned}
$$
with $S_{\overline{x}}\triangleq\sum_{n=1}^n(x_n - \overline{x})^2$ and $\overline{x} \triangleq \frac{1}{n} \sum_{i=1}^{n}x_i$.
Note in the above derivation, we used the expression for the likelihood of a Gaussian distribution in \eqref{equation:uni_gaussian_likelihood}.
Furthers discussion on the posterior marginal likelihood for the NIG or NIX (normal-inverse-Chi-squared) priors can be found, for example, in \citet{lu2023bayesian}.

\subsection{Full Conjugate Prior Model}


The joint conjugate prior on the mean and variance parameters of a Gaussian distribution provides insight into constructing a full conjugate prior for the Bayesian linear model.

To develop the full conjugate Bayesian model for linear regression, we place an inverse-Gamma prior (Definition~\ref{definition:inverse_gamma_distribution}) on the variance parameter.
Alternatively, placing a Gamma prior on the inverse variance (also known as precision), $\gamma=1/\sigma^2$, is mathematically equivalent to using an inverse-Gamma prior on $\sigma^2$ itself.

Consider the same setting as the semi-conjugate prior distribution discussed in Section~\ref{sec:semiconjugate}, the likelihood function---given the regression coefficient $\bbeta$, the design matrix $\bX$, the variance scale $\sigma^2$---is defined as follows:
$$
\begin{aligned}
\mathrm{Likelihood} 
&= \by \mid  \bX, \bbeta, \sigma^2 
\sim \normal(\bX\bbeta, \sigma^2\bI),
\end{aligned}
$$
which is identical to the likelihood density used in the zero-mean prior model (Section~\ref{sec:bayesian-zero-mean}) and the semi-conjugate model (Section~\ref{sec:semiconjugate}).
However, in this case, we specify:
\begin{itemize}
\item An Gaussian prior (the covariance matrix of which is not fixed this time) over the regression coefficient $\bbeta$, where the covariance matrix depends on the variance parameter $\sigma^2$.
\item An inverse-Gamma prior over the variance parameter $\sigma^2$, which is known as a \textit{hyperprior} since the variance parameter $\sigma^2$ itself is used as a parameter in the Gaussian prior; see Figure~\ref{fig:blm_fullconjugate} for an illustration.
\end{itemize}
Specifically, the prior distributions are defined as:
\begin{equation}
\begin{aligned}
{\color{mylightbluetext}\mathrm{Prior:\,}} &\bbeta\mid \sigma^2 \sim \normal(\bbeta_0, {\color{mylightbluetext}\sigma^2} \bSigma_0); \\
{\color{mylightbluetext}\mathrm{Hyperprior:\,}} &{\color{mylightbluetext}\sigma^2 \sim \inversegammadist(a_0, b_0)}, \nonumber
\end{aligned}
\end{equation}
where the blue-colored terms highlight the differences from earlier models.
A graphical representation of this Bayesian linear model is provided in Figure~\ref{fig:blm_fullconjugate}.

Note that, unlike in the semi-conjugate case---where we placed a Gamma prior on the precision $\gamma=1/\sigma^2$---here we directly use an inverse-Gamma prior on the variance $\sigma^2$. However, these two approaches are mathematically equivalent. This equivalence can be shown using the change-of-variables formula and computing the corresponding Jacobian ``determinant"; see Problem~\ref{prob:equiv_invg_ga_var}.

We can equivalently express the prior as what is also known as the \textit{normal-inverse-Gamma (NIG) distribution} (compare to Definition~\ref{definition:normal_inverse_gamma}):
$$
\begin{aligned}
\mathrm{Prior:} \,\bbeta,\sigma^2 
&\sim \nig(\bbeta_0, \bSigma_0, a_0, b_0) \\
&=  \normal(\bbeta_0, \sigma^2 \bSigma_0)\cdot \inversegammadist(a_0, b_0) ,
\end{aligned}
$$
where the normal distribution is multivariate, in contrast to the univariate version used in Definition~\ref{definition:normal_inverse_gamma}.

Once again, by applying  Bayes' theorem ``$\mathrm{Posterior} \propto \mathrm{Likelihood} \times \mathrm{Prior} $", we obtain the posterior distribution:
\begin{equation}
\begin{aligned}
\mathrm{Posterior}&= p(\bbeta,\sigma^2\mid \by,\bX) 
\propto p(\by\mid \bX, \bbeta, \sigma^2) p(\bbeta, \sigma^2 \mid  \bbeta_0, \bSigma_0, a_0, b_0) \\
&=  \frac{1}{(2\pi \sigma^2)^{n/2}} \exp\left\{-\frac{1}{2\sigma^2} (\by-\bX\bbeta)^\top(\by-\bX\bbeta)\right\} \\
&\,\,\,\,\,\, \times \frac{1}{(2\pi \sigma^2)^{p/2} \abs{\bSigma_0}^{1/2}} \exp\left\{-\frac{1}{2\sigma^2} (\bbeta - \bbeta_0)^\top\bSigma_0^{-1} (\bbeta - \bbeta_0)\right\} \\
&\,\,\,\,\,\, \times  \frac{{b_0}^{a_0}}{\Gamma(a_0)} \frac{1}{(\sigma^2)^{a_0+1}} \exp\{-\frac{b_0}{\sigma^2} \} \\
&\propto \frac{1}{(2\pi \sigma^2)^{p/2} }  \exp\left\{  \frac{1}{2\sigma^2} (\bbeta -\bbeta_3)^\top\bSigma_3^{-1}(\bbeta -\bbeta_3) \right\} \\
&\,\,\,\,\,\, \times \frac{1}{(\sigma^2)^{a_0 +\frac{n}{2}+1}} \exp\left\{-\frac{1}{\sigma^2} \big[b_0+\frac{1}{2} (\by^\top\by +\bbeta_0^\top\bSigma_0^{-1}\bbeta_0 -\bbeta_3^\top\bSigma_3^{-1}\bbeta_3) \big]\right\}, \nonumber
\end{aligned}
\end{equation}
where  
$$
\begin{aligned}
\bSigma_3 &\triangleq ( \bX^\top\bX + \bSigma_0^{-1})^{-1};\\
\bbeta_3  &\triangleq \bSigma_3(\bX^\top\by + \bSigma_0^{-1}\bbeta_0) = \left( \bX^\top\bX + \bSigma_0^{-1}\right)^{-1}(\bSigma_0^{-1}\bbeta_0 + \bX^\top\by).
\end{aligned}
$$ 
Let $a_n \triangleq a_0 +\frac{n}{2}+1$ and $b_n\triangleq b_0+\frac{1}{2} (\by^\top\by +\bbeta_0^\top\bSigma_0^{-1}\bbeta_0 -\bbeta_3^\top\bSigma_3^{-1}\bbeta_3) $. The posterior admits conjugacy and follows a NIG distribution:
\begin{equation}
\begin{aligned}
\mathrm{Posterior}&=  \bbeta, \sigma^2 \mid  \by, \bX \sim \nig(\bbeta_3, \bSigma_3, a_n, b_n). \nonumber
\end{aligned}
\end{equation}
\begin{figure}[h!]
\centering
\includegraphics[width=0.45\textwidth]{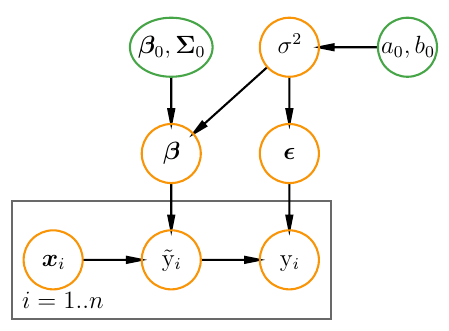}
\caption{Graphical representation of the Bayesian linear model with a full conjugate prior. Orange circles represent observed and latent variables, green circles denote prior variables, and plates represent repeated variables. 
The  comma ``," in the variable represents ``and." In the graph, $\bbeta \sim \normal(\bbeta_0, \sigma^2\bSigma_0)$, $\sigma^2 \sim \inversegammadist(a_0, b_0)$, $\bepsilon \sim \normal(\bzero, \sigma^2 \bI)$, $\widetilde{\ry}_i = \bx_i^\top\bbeta$, and $\ry_i  = \widetilde{\ry}_i+\epsilon_i$.}
\label{fig:blm_fullconjugate}
\end{figure}

\index{Semi-conjugate prior}
\paragrapharrow{Connection to  zero-mean prior and semi-conjugate prior models.}
We highlight the connection of the full conjugate model to the zero-mean prior and semi-conjugate prior models as follows:
\begin{enumerate}
\item In this model, $\bSigma_0$ is a fixed hyper-parameter that controls the strength and direction of the prior belief about the regression coefficients $\bbeta$


\item If we assume further that the design matrix $\bX$ has full rank, then as $\bSigma_0^{-1} \rightarrow \bzero$, the posterior mean $\bbeta_3 \rightarrow \widehat{\bbeta} = (\bX^\top\bX)^{-1}\bX\by$, which converges to the OLS estimate. 

\item  When $b_0 \rightarrow \infty$, then $\sigma^2 \rightarrow \infty$ and $\bbeta_3$ is approximately approaching $\bbeta_0$, the prior expectation of parameter. Compared to $\bbeta_2$ in Section~\ref{sec:semiconjugate}, $\sigma^2 \rightarrow \infty$ will make $\bbeta_2$ approach  $\bbeta_0$, where $\sigma^2$ is a fixed hyper-parameter.

\item \textit{Weighted average interpretation.} We can rewrite $\bbeta_3$ as follows: 
\begin{equation}
\begin{aligned}
\bbeta_3 &=  (\bX^\top\bX + \bSigma_0^{-1})^{-1}(\bSigma_0^{-1}\bbeta_0+\bX^\top\by) \\
&= (\bX^\top\bX + \bSigma_0^{-1})^{-1} \bSigma_0^{-1}\bbeta_0 + (\bX^\top\bX + \bSigma_0^{-1})^{-1} (\bX^\top\bX) (\bX^\top\bX)^{-1}\bX^\top\by \\
&=(\bI-\bC)\bbeta_0 + \bC \widehat{\bbeta}, \nonumber
\end{aligned}
\end{equation}
where $\widehat{\bbeta}=(\bX^\top\bX)^{-1}\bX^\top\by$ is the OLS estimate of $\bbeta$, and the matrix $\bC$ is defined as $\bC\triangleq \left(\bX^\top\bX + \bSigma_0^{-1}\right)^{-1} (\bX^\top\bX)$. Therefore, the posterior mean of $\bbeta$ is a weighted average of the prior mean and the OLS estimate of $\bbeta$. Thus, if we set $\bbeta_0 = \widehat{\bbeta}$, the posterior mean of $\bbeta$ becomes exactly  $\widehat{\bbeta}$.

\item  From $a_n = a_0 +\frac{n}{2}+1$, we see that $2a_0$ can be interpreted as the effective sample size contributed by the prior on $\sigma^2$. This helps quantify the influence of the prior relative to the observed data.

\item  $\bSigma_3^{-1} = \bX^\top\bX + \bSigma_0^{-1}$: The posterior inverse covariance matrix (i.e., the precision matrix) is equal to data inverse covariance $\bX^\top\bX$ + prior precision matrix. Hence, the posterior uncertainty reflects a balance between what the data tells us and what we believed a priori.
\end{enumerate}

\section{Beyond Bayesian Approach: Gaussian Process Regression}

A \textit{Gaussian process (GP)} is a powerful nonparametric Bayesian tool used primarily for regression and classification tasks. At its core, a GP defines a distribution over functions, allowing us to reason about uncertainty in function estimation. Rather than specifying a fixed functional form as in parametric models, GPs generalize this idea by assuming that any finite set of function values follows a multivariate Gaussian distribution.

Importantly, Gaussian process regression can be derived as an extension of the Bayesian linear model with a zero-mean Gaussian prior on the coefficient parameter. When the feature space is implicitly defined via a kernel function---corresponding to the covariance function of the GP---the posterior distribution over functions naturally emerges without explicitly modeling parameters. This connection clarifies that Gaussian processes are not arbitrary constructs but have a firm grounding in Bayesian inference, often obscured when GPs are introduced independently of their parametric origins.

\index{Basis function}
\subsection*{Predictive Distribution of Bayesian Linear Model with Zero-Mean Prior}
Building on the Bayesian linear model with a zero-mean prior introduced in Section~\ref{sec:bayesian-zero-mean}, the predictive distribution $g_\ast = g_\ast(\bx_\ast)$ for a new data observation $\bx_\ast$ remains a Gaussian distribution:
\begin{equation}\label{equation:zeromean-predictive}
\begin{aligned}
p(g_\ast \mid  \bx_\ast \bX, \by, \sigma^2) &=\int p(g_\ast \mid  \bx_\ast , \bbeta)p(\bbeta\mid  \bX, \by, \sigma^2) d\bbeta \\ 
&=\normal\left(\frac{1}{\sigma^2}\bx_\ast^\top \bSigma_1\bX^\top\by, \bx_\ast^\top \bSigma_1\bx_\ast\right) \\
&=\normal(\bx_\ast^\top \bbeta_1, \bx_\ast^\top \bSigma_1\bx_\ast), 
\end{aligned}
\end{equation}
where the mean of the predictive distribution corresponds to the posterior mean of the weight vector  (i.e., $\bbeta_1$) multiplied by the new input $\bx_\ast^\top$, and the predictive variance is a quadratic form involving $\bx_\ast^\top $, indicating that the \textbf{predictive uncertainty} increases as the magnitude of the input grows.

However, this Bayesian linear model has limited expressive power and may struggle to capture complex patterns in the data. One way to address this limitation is by introducing a set of basis functions that map the original $p$-dimensional input space into a higher-dimensional space, say $q$-dimensional:
$$
\bx\in \real^{p} \rightarrow \bphi(\bx) \in \real^{q}
\qquad\text{and}\qquad 
\bX\in\real^{n\times p} \rightarrow \bPhi(\bX) \in \real^{n\times q}.
$$

Nonetheless, increasing the dimensionality introduces computational challenges. Specifically, computational complexity grows quadratically with the number of dimensions. For example, computing the predictive variance increases from  $\mathcalO(p^2+p)$ to $\mathcalO(q^2+q)$. Such quadratic growth becomes a significant issue when scalability and efficiency are crucial.


One way to address the computation challenges is using the kernel trick.
\index{Kernel trick}
\begin{remark}[Kernel trick]
The kernel trick can significantly reduce computational costs in high-dimensional spaces. The key requirement is that all operations depend only on inner products between input vectors.
\end{remark}

\begin{example}[Kernel trick]
For some examples, we have:
\begin{enumerate}
\item   The computation of $\bphi(\bx_\ast)^\top \bPhi(\bX)^\top = \bz \in \real^{1\times n}$ involves only inner products in the input space. Therefore, we can apply the kernel trick. Each element of $\bz$ becomes $z_i = k(\bx_\ast, \bx_i)$, where $k(\cdot, \cdot)$ denotes  a kernel function. 
Computing the kernel function can be more efficient than directly working in the high-dimensional space, potentially requiring less than $\mathcalO(q)$ operations.
\item In contrast, the computation $\by^\top \bPhi(\bX)$ involves inner products between inputs and outputs, which cannot be expressed purely in terms of input-input dot products. Hence, the kernel trick does not apply here.
\end{enumerate}
In summary, we can apply the kernel trick only when all computations involve inner products between input vectors (i.e., input-input dot products).
\end{example}

With this in mind, if we can reformulate Equation~\eqref{equation:zeromean-predictive} so that all computations depend only on dot products in the input space, we can apply the kernel trick.
To achieve this, we map the original input features into a higher-dimensional space. This transformation replaces the original inputs \{$\bx_\ast, \bX$\} (the ``$x$-space" $\in \real^p$) with their corresponding feature representations \{$\bphi, \bPhi$\}   (the ``$z$-space" $\in \real^q$). Applying this mapping leads to the predictive distribution in the \textit{$z$-space form}:
\begin{equation}\label{equation:zeromean-predictive-high}
\boxed{
\begin{aligned}
z\mhyphen\mathrm{Space \, Form:}\\
g_\ast \mid  \bx_\ast \bX, \by, \sigma^2 
&\sim \normal(\bphi(\bx_\ast)^\top \bbeta_1, \bphi(\bx_\ast)^\top \bSigma_1 \bphi(\bx_\ast)) \\ 
&= \normal\left(\bphi(\bx_\ast)^\top \Big(\frac{1}{\sigma^2}\bPhi^\top\bPhi + \bSigma_0^{-1}\Big)^{-1}\Big(\frac{1}{\sigma^2}\bPhi^\top\by\Big), \bphi(\bx_\ast)^\top \bSigma_1 \bphi(\bx_\ast)\right), \nonumber
\end{aligned}
}
\end{equation}
where $\bSigma_1=(\frac{1}{\sigma^2}\bPhi^\top\bPhi + \bSigma_0^{-1})^{-1}$ (see Section~\ref{sec:bayesian-zero-mean}) and $\bSigma_0\in\real^{q\times q}$. Let us define $\bK \triangleq \bPhi \bSigma_0 \bPhi^\top$ and $\bphi_\ast \triangleq \bphi(\bx_\ast)$. 
Then, we can derive the identity:
\begin{equation}\label{equation:gp-equal}
\frac{1}{\sigma^2} \bPhi^\top(\bK +\sigma^2\bI) = \frac{1}{\sigma^2}\bPhi^\top(\bPhi \bSigma_0\bPhi^\top +\sigma^2 \bI) = \bSigma_1^{-1} \bSigma_0  \bPhi^\top, 
\end{equation}
where we use the matrix identity $\bA\bB\bA + \bA = \bA(\bB\bA +\bI) = (\bA\bB+\bI)\bA$. Note that we do not distinguish the notations of $\bSigma_0$ and $\bSigma_1$ in their notations in $x$ and $z$-spaces. 
However, we do distinguish the notations of $\bX$ in $x$-space and $\bPhi$ in $z$-space, respectively.

Next, by left-multiplying Equation~\eqref{equation:gp-equal} by $\bSigma_1$  and  right-multiplying by $(\bK +\sigma^2\bI)^{-1}$, we transform the predictive distribution into an expression involving only inner products about the inputs:
\begin{equation}\label{equation:zeromean-predictive-high-2}
\boxed{
\begin{aligned}
\mathrm{Inner\, Product\, Form:}\\
g_\ast \mid  \bx_\ast \bX, \by, \sigma^2 \sim \normal(&\bphi_\ast^\top \bSigma_0 \bPhi^\top(\bK+\sigma^2\bI)^{-1}\by, \\
&\bphi_\ast^\top \bSigma_0 \bphi_\ast-\bphiast^\top\bSigma_0 \bPhi^\top(\bK+\sigma^2\bI)^{-1} \bPhi\bSigma_0 \bphiast), \nonumber
\end{aligned}
}
\end{equation}
where we use the \textit{matrix inversion lemma} such that $(\bA+\bB\bC)^{-1} = \bA^{-1} -\bA^{-1} \bB(\bI+\bC\bA^{-1}\bB)^{-1}\bC\bA^{-1}$ if $\bA$ is nonsingular \citep{lu2021numerical}.
Now, all terms in the input space are expressed using the forms $\bphiast^\top\bSigma_0\bphiast, \, \bphiast^\top\bSigma_0 \bPhi^\top$, and $ \bPhi \bSigma_0 \bPhi$, which aligns with the requirements for applying the kernel trick.

We define kernel variables as follows: 
\begin{subequations}
\begin{align}
K(\bx_\ast, \bx_\ast)  &\triangleq  \bphi_\ast^\top \bSigma_0 \bphi_\ast; \\
K(\bx_\ast, \bX)      &\triangleq \bphi_\ast^\top \bSigma_0 \bPhi^\top;\\
K(\bX,\bX)            &\triangleq  \bPhi \bSigma_0 \bPhi^\top.
\end{align}
\end{subequations}
Using these definitions, we can now express the predictive distribution in kernel form:
\begin{equation}\label{equation:zeromean-predictive-high-kernel-form}
\boxed{
\begin{aligned}
\mathrm{Kernel \, Form: }\\
g_\ast \mid  \bx_\ast \bX, \by, \sigma^2 \sim \normal(&K(\bx_\ast, \bX)(K(\bX,\bX)+\sigma^2\bI)^{-1}\by, \\
&K(\bx_\ast, \bx_\ast)-K(\bx_\ast, \bX)(K(\bX, \bX)+\sigma^2\bI)^{-1} K(\bX, \bx_\ast)).
\end{aligned}
}
\end{equation}
Since $\bSigma_0$ is positive definite (see Section~\ref{sec:bayesian-zero-mean}), it can be factorized using the square root decomposition $\bSigma_0=\bSigma_0^{1/2}\bSigma_0^{1/2}$ (Theorem~\ref{theorem:unique-factor-pd}). 
Therefore, it can be absorbed into the kernel function. For example, let $\widetilde{\bphi}(\bx_\ast) = \bphi(\bx_\ast)\bSigma_0^{1/2}$, $K(\bx_\ast, \bx_\ast)$ can be denoted by $K(\bx_\ast, \bx_\ast) = \widetilde{\bphi}(\bx_\ast)^\top\widetilde{\bphi}(\bx_\ast)$.
Similarly, $K(\bx_\ast, \bX)$ can be represented as $K(\bx_\ast, \bX)=\widetilde{\bphi}_\ast^\top  \widetilde{\bPhi}^\top$, and $K(\bX,\bX)$ as $K(\bX,\bX)=\widetilde{\bPhi}  \widetilde{\bPhi}^\top$.

\section{Kernels in a Nutshell}

A kernel, employing basis functions, implicitly transforms the input vector $\bx \in \real^p$ into a higher-dimensional feature space $\bphi(\bx)\in \real^q$. 
This transformation changes the way we compute inner products: instead of computing $\bx^\top\bx'$ in the original $p$-dimensional space, we compute the kernel function $k(\bx, \bx')=\bphi(\bx)^\top\bphi(\bx')$ in the $q$-dimensional space. 
As a result, the kernel matrix $K(\bX, \bX)$---which contains all pairwise kernel evaluations between data points---has two important properties:
\begin{enumerate}
\item $K$ must be symmetric, i.e., $k(\bx, \bx') = k(\bx', \bx)$.
\item $K$ must be positive semidefinite (PSD).
\end{enumerate}

\index{Positive semidefinite}
\begin{proof}[of kernel matrix $K$ is PSD]
Let $k_{ij} \triangleq k(\bx_i, \bx_j)$, $\forall \, i, j \in \{1, 2, \ldots, n\}$. And let  $\bt\in\real^n$ be any real vector, we have 
\begin{equation}
\begin{aligned}
\bt^\top K \bt &= \sum_{i,j=1}^n t_i t_j k_{ij} =  \sum_{i,j=1}^n t_i t_j \bphi(\bx_i)^\top\bphi(\bx_j) \\
&=  \left(\sum_{i=1}^n t_i \bphi(\bx_i)\right)^\top \left(\sum_{j=1}^n t_j \bphi(\bx_j)\right) =  \norm{\sum_{i=1}^n t_i \bphi(\bx_i)}^2 \geq 0. \nonumber
\end{aligned}
\end{equation}
This completes the proof.
\end{proof}

At first glance, it may seem that $k(\bx, \bx')$ can be any arbitrary function of $\bx$ and $\bx'$. However, the requirement that the kernel matrix be positive semidefinite restricts the form of valid kernel functions. Specifically, this constraint ensures that every valid kernel corresponds to an implicit inner product in some (possibly infinite-dimensional) feature space.
\index{Gaussian kernel}
\index{Linear kernel}
\index{Polynomial kernel}
\begin{remark}[Kernels that are usually used: some specific kernels]\label{remark:kernrls}
The following are examples of widely used kernel functions:
\begin{enumerate}
\item  Linear kernel: $k(\bx, \bx') = \bx^\top\bx'$.
\item  Polynomial kernel: $k(\bx, \bx') = (\eta + \gamma \bx^\top\bx')^Q$ with $\gamma>0, \eta \geq 0$.
\item  Gaussian kernel: $k(\bx, \bx') = \exp(-\gamma \norm{\bx-\bx'}^2)$. 
We now show that the Gaussian kernel corresponds to an infinite-dimensional feature mapping. Without loss of generality, let $\gamma = 1$. Then,
\begin{equation}
\begin{aligned}
k(\bx, \bx') &= \exp\{- \norm{\bx-\bx'}^2\}
=\exp\{-\bx^\top\bx\}\exp\{-\bx'^\top\bx'\}\exp\{2\bx^\top\bx'\} \\
&\underset{\mathrm{expansion}}{\overset{\mathrm{Taylor}}{=}}
\exp\{-\bx^\top\bx\}\exp\{-\bx'^\top\bx'\}\exp\left\{\sum_{i=0}^\infty \frac{(2\bx^\top\bx')^i}{i!}\right\} \\
&=\sum_{i=0}^\infty\left(\exp\{-\bx^\top\bx\}\exp\{-\bx'^\top\bx'\} \sqrt{\frac{2^i}{i!}}\sqrt{\frac{2^i}{i!}} (\bx)^{i}\cdot (\bx')^i  \right) \\
&=\sum_{i=0}^\infty\left({\color{winestain}\exp\{-\bx^\top\bx\}\sqrt{\frac{2^i}{i!}} (\bx)^{i}} \cdot {\color{mylightbluetext}\exp\{-\bx'^\top\bx'\} \sqrt{\frac{2^i}{i!}}  (\bx')^i}  \right) \\
&={\color{winestain}\bphi(\bx)^\top} {\color{mylightbluetext}\bphi(\bx')},\nonumber
\end{aligned}
\end{equation}
where $\bphi(\bx) = \sum_{i=0}^\infty  \exp\{-\bx^\top\bx\}\sqrt{\frac{2^i}{i!}} (\bx)^{i} $. 
This shows that the Gaussian kernel maps inputs from a finite-dimensional space to an infinite-dimensional space. A similar derivation holds for general $\gamma > 0$.
\item  Other valid kernels: One major advantage of using kernel methods is that we do not need to explicitly define or compute the feature mapping $\bphi(\bx)$. Instead, we can work directly with the kernel matrix $K$, which encodes all necessary inner products in the feature space.
\end{enumerate}
\end{remark}

\index{Zero-mean prior}
\index{Gaussian processes}
\section{Gaussian Process from Zero-Mean Prior Model}
We use Gaussian processes (GPs) to model distributions over functions. GPs are a natural extension of multivariate Gaussian distributions to infinite index sets---either countably infinite or continuous. Formally, we define a Gaussian process as follows:
\begin{definition}[Gaussian process]
A Gaussian process is a collection of random variables, any finite number of which have a joint Gaussian distribution. The definition does not exclude Gaussian processes with finite index sets, which would be simply Gaussian distributions.
\end{definition}

\subsection{Noise-Free Observations}
Following the Bayesian linear model with a zero-mean prior introduced  in Section~\ref{sec:bayesian-zero-mean}, we assume a zero-mean prior on the weight coefficient $\bbeta \sim \normal(\bzero, \bSigma_0)$. 
For each input observation $\bx$, we define the output as  $g(\bx) \triangleq \bphi(\bx)^\top \bbeta$. 
The mean and covariance of the prior over function outputs are then given by:
\begin{subequations}\label{equation:noisefree-gp}
\begin{align}
\Exp[g(\bx)] &= \bphi(\bx)^\top \Exp[\bbeta] = 0, \\
\Exp[g(\bx) g(\bx')] &= \bphi(\bx)^\top \Exp[\bbeta \bbeta^\top] \bphi(\bx')=\bphi(\bx)^\top \bSigma_0 \bphi(\bx').
\end{align}
\end{subequations}
In this formulation, the prior covariance matrix is typically specified manually in the Bayesian linear model. Alternatively, it can be defined using a kernel function:
\begin{equation}
k(\bx, \bx')=\bphi(\bx)^\top \bSigma_0 \bphi(\bx').
\end{equation}

Now, suppose we have the training input design matrix $\bX$, training output vector $\by$, test input design matrix $\bXast$, and test output vector $\bgast$.
Using Equation~\eqref{equation:noisefree-gp}, we can write the joint distribution of the training outputs $\by$ and the test outputs $\bgast$ as:
$$
\left[
\begin{matrix}
\by  \\
\bgast 
\end{matrix}
\right] \sim \normal \left(\bzero, \left[
\begin{matrix}
K(\bX,\bX), & K(\bX,\bXast) \\
K(\bXast, \bX), &  K(\bXast, \bXast)
\end{matrix}
\right] \right).
$$
Given this joint distribution, we can apply standard properties of Gaussian distributions to derive the marginal distribution of the test outputs $\bgast$.
\begin{lemma}[Marginal distribution of test outputs]\label{lemma:noise-free-gaussian-identity}
Continuing from Section~\ref{sec:bayesian-zero-mean}, we assume a zero-mean prior on the weight coefficient $\bbeta \sim \normal(\bzero, \bSigma_0)$, and define $g(\bx) \triangleq \bphi(\bx)^\top \bbeta$. Given the observed training inputs $\bX$, the corresponding training outputs $\by$, and the test inputs $\bXast$, the marginal distribution of the test outputs $\bgast$ is:
\begin{equation}
\begin{aligned}
\bgast \mid  \bXast, \bX, \by \sim \normal(&K(\bXast, \bX)K(\bX,\bX)^{-1}\by, \\
&K(\bXast, \bXast)-K(\bXast, \bX)K(\bX, \bX)^{-1} K(\bX, \bXast)).\nonumber
\end{aligned}
\end{equation} 
\end{lemma}

\begin{proof}[of Lemma~\ref{lemma:noise-free-gaussian-identity}]
Let $\bx$ and $\by$ be jointly Gaussian random vectors:
$$
\left[
\begin{matrix}
\bx  \\
\by 
\end{matrix}
\right] \sim \normal \left(
\left[
\begin{matrix}
\bu_x \\
\bu_y
\end{matrix}
\right]
, 
\left[
\begin{matrix}
\bA, & \bC \\
\bC^\top, &  \bB
\end{matrix}
\right] \right).
$$
Using standard properties of multivariate Gaussian distributions,  the marginal distribution of $\bx$ and the conditional distribution of $\bx$ given $\by$ are 
\begin{subequations}
\begin{align}
\bx &\sim \normal(\bu_x, \bA);\\
\bx\mid \by &\sim \normal(\bu_x + \bC\bB^{-1}(\by-\bu_y) , \bA-\bC\bB^{-1}\bC^\top  ); \\
\by &\sim \normal(\bu_y, \bB); \\
\by\mid \bx &\sim \normal(\bu_y + \bC^\top\bA^{-1}(\bx-\bu_x),\bB-\bC^\top\bA^{-1}\bC). 
\end{align}
\end{subequations}
Applying these identities to our setting, where the joint distribution of $\by$ and $\bgast$ follows a multivariate Gaussian, we obtain:
\begin{equation}
\begin{aligned}
\bgast \mid  \bXast, \bX, \by \sim \normal(&K(\bXast, \bX)K(\bX,\bX)^{-1}\by, \\
&K(\bXast, \bXast)-K(\bXast, \bX)K(\bX, \bX)^{-1} K(\bX, \bXast)),\nonumber
\end{aligned}
\end{equation} 
which completes the proof.	
\end{proof}

The resulting marginal distribution of the test output $\bgast$ matches the kernel form of the Bayesian linear model with a zero-mean prior given in Equation~\eqref{equation:zeromean-predictive-high-kernel-form}. The only difference is the noise term and that now we are working with vector-valued outputs instead of scalar predictions (i.e., $g_\ast \rightarrow \bgast$).

\begin{remark}[What does a distribution over functions mean?]
When given a kernel defined as  $k(\bx, \bx')=\bphi(\bx)^\top \bSigma_0 \bphi(\bx')$, which corresponds to a zero-mean prior on the weight coefficient $\bbeta \sim \normal(\bzero, \bSigma_0)$, we obtain the following prior distribution over the test outputs $\bgast$:
$$
\begin{matrix}
\bgast 
\end{matrix}
\sim \normal \left(\bzero, \left[
\begin{matrix}
K(\bXast,\bXast)
\end{matrix}
\right] \right).
$$
Therefore, we can sample functions from $p(\bgast \mid  \bXast)$, where each sample represents a possible function evaluated at the test inputs $\bXast$. By sampling functions, we mean generating realizations of  output values corresponding to the given input points (possibly finite or infinite number of samples) and its distribution. This is precisely what is meant by a  \textit{distribution over functions}.

When observed data $\by$ and training inputs $\bX$ are available, we have the prior distribution over the training outputs $\by$ and test ouputs $\bgast$:
$$
\left[
\begin{matrix}
\by  \\
\bgast 
\end{matrix}
\right] \sim \normal \left(\bzero, \left[
\begin{matrix}
K(\bX,\bX), & K(\bX,\bXast) \\
K(\bXast, \bX), &  K(\bXast, \bXast)
\end{matrix}
\right] \right).
$$
From this, we can derive the posterior distribution of the test outputs given the observed data:
\begin{equation}
\begin{aligned}
\bgast \mid  \bXast \bX, \by \sim \normal(&K(\bXast, \bX)K(\bX,\bX)^{-1}\by, \\
&K(\bXast, \bx_\ast)-K(\bXast, \bX)K(\bX, \bX)^{-1} K(\bX, \bXast)). \nonumber
\end{aligned}
\end{equation} 
Using this posterior, we can sample functions from the conditional distribution $p(\bgast \mid  \bXast \bX, \by)$, which reflect our updated beliefs about the underlying function after observing the data.

\end{remark}

\index{Gaussian disturbance}
\subsection{Noisy Observations}
Following again from Section~\ref{sec:bayesian-zero-mean}, we assume a zero-mean prior $\bbeta \sim \normal(\bzero, \bSigma_0)$. And we define $g(\bx) \triangleq \bphi(\bx)^\top \bbeta +{\color{mylightbluetext} \epsilon}$ and ${\color{mylightbluetext} \epsilon \sim \normal(0, \sigma^2)}$. Then, the mean and covariance for the prior output are:
\begin{equation}
\begin{aligned}
\Exp[g(\bx)] &= \bphi(\bx)^\top \Exp[\bbeta] + \Exp[\bepsilon] = \bzero, \\
\Exp[g(\bx) g(\bx)] &= \bphi(\bx)^\top \Exp[\bbeta \bbeta^\top] \bphi(\bx)+ 2\bphi(\bx)^\top \Exp[\bbeta] \Exp[\epsilon]+\Exp[\epsilon^2] \\
&=\bphi(\bx)^\top \bSigma_0 \bphi(\bx) +{\color{mylightbluetext} \sigma^2}, \\ 
\Exp[g(\bx) g(\bx')] &= \bphi(\bx)^\top \Exp[\bbeta \bbeta^\top] \bphi(\bx')+ \bphi(\bx)^\top \Exp[\bbeta] \Exp[\epsilon']+\bphi(\bx')^\top \Exp[\bbeta] \Exp[\epsilon]+\Exp[\epsilon\epsilon']\\
&=\bphi(\bx)^\top \bSigma_0 \bphi(\bx'). \nonumber
\end{aligned}
\end{equation}
That is,
\begin{equation}
\cov(y_i, y_j) = k(\bx_i, \bx_j)\delta_{ij}, \nonumber
\end{equation}
where $\delta_{ij}$ is a Kronecker delta, which is equal to 1 if and only if  $i=j$ and 0 otherwise.
Therefore,  the joint distribution of the training outputs $\by$ and the test outputs $\bgast$ can be expressed as:
$$
\left[
\begin{matrix}
\by  \\
\bgast 
\end{matrix}
\right] \sim \normal \left(\bzero, \left[
\begin{matrix}
K(\bX,\bX) + {\color{mylightbluetext} \sigma^2\bI}, & K(\bX,\bXast) \\
K(\bXast, \bX), &  K(\bXast, \bXast)
\end{matrix}
\right] \right).
$$
Using Gaussian identities, we can derive the marginal distribution of the test outputs $\bgast$.
\index{Marginal distribution}
\begin{lemma}[Marginal distribution of test outputs]\label{lemma:noisy-gaussian-identity}
Following from Section~\ref{sec:bayesian-zero-mean}, assume a zero-mean prior $\bbeta \sim \normal(\bzero, \bSigma_0)$ and define $g(\bx) \triangleq \bphi(\bx)^\top \bbeta+ {\color{mylightbluetext} \epsilon}$, where ${\color{mylightbluetext} \epsilon \sim \normal(0, \sigma^2)}$. Given observed training inputs $\bX$, the corresponding training outputs $\by$, and the test inputs $\bXast$, the marginal distribution of the test outputs $\bgast$ is
\begin{equation}
\begin{aligned}
\bgast \mid  \bXast \bX, \by \sim \normal(&K(\bXast, \bX)(K(\bX,\bX)+{\color{mylightbluetext} \sigma^2\bI})^{-1}\by, \\
&K(\bXast, \bXast)-K(\bXast, \bX)(K(\bX, \bX)+{\color{mylightbluetext} \sigma^2\bI})^{-1} K(\bX, \bXast)). \nonumber
\end{aligned}
\end{equation} 
\end{lemma}
The marginal distribution of the test outputs $\bgast$ aligns with the form of Equation~\eqref{equation:zeromean-predictive-high-kernel-form}, except that now the output is a vector (i.e., $g_\ast \rightarrow \bgast$).

\subsection{Further Extension, Generalized Gaussian Process} 
Building  upon the concept   of the generalized least squares introduced in Section~\ref{section:generalizedLS}, 
we can now assign different noise variances to individual observations. Specifically, suppose the noise covariance matrix takes the form $\sigma^2\bLambda$, where $\bLambda$ is a diagonal matrix. This allows each observation to have its own noise level, which is particularly useful when dealing with heteroscedastic (non-constant variance) data.
Under this assumption, the marginal distribution of the test outputs $\bgast$ becomes:
\begin{equation}
\begin{aligned}
\bgast \mid  \bXast \bX, \by \sim \normal(&K(\bXast, \bX)(K(\bX,\bX)+{\color{mylightbluetext} \sigma^2\bLambda})^{-1}\by, \\
&K(\bXast, \bXast)-K(\bXast, \bX)(K(\bX, \bX)+{\color{mylightbluetext} \sigma^2\bLambda})^{-1} K(\bX, \bXast)). \nonumber
\end{aligned}
\end{equation} 
It's important to note that the noise covariance should not depend on the input matrix $\bX$. Otherwise, the kernel trick---which allows us to work implicitly in high-dimensional feature spaces---would no longer be applicable.

\begin{figure}[h]
\centering  
\vspace{-0.35cm}  
\subfigtopskip=2pt 
\subfigbottomskip=2pt  
\subfigcapskip=-5pt 
\subfigure[Use half data set.]{\label{fig:gphalf}
\includegraphics[width=0.475\linewidth]{./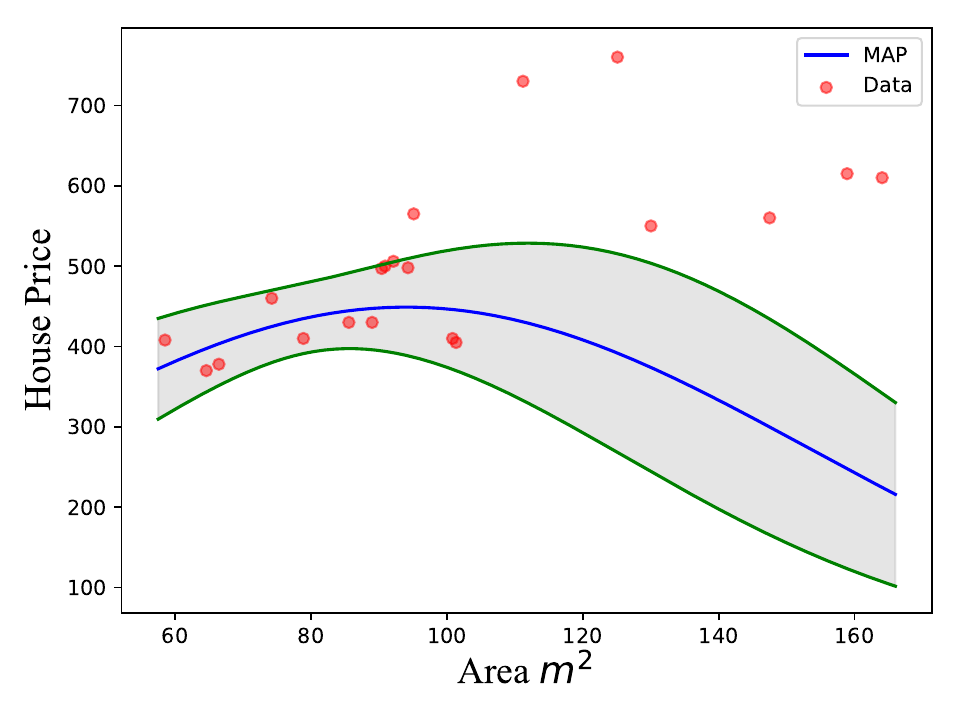}}
\quad 
\subfigure[Use entire data set.]{\label{fig:gpall}
\includegraphics[width=0.475\linewidth]{./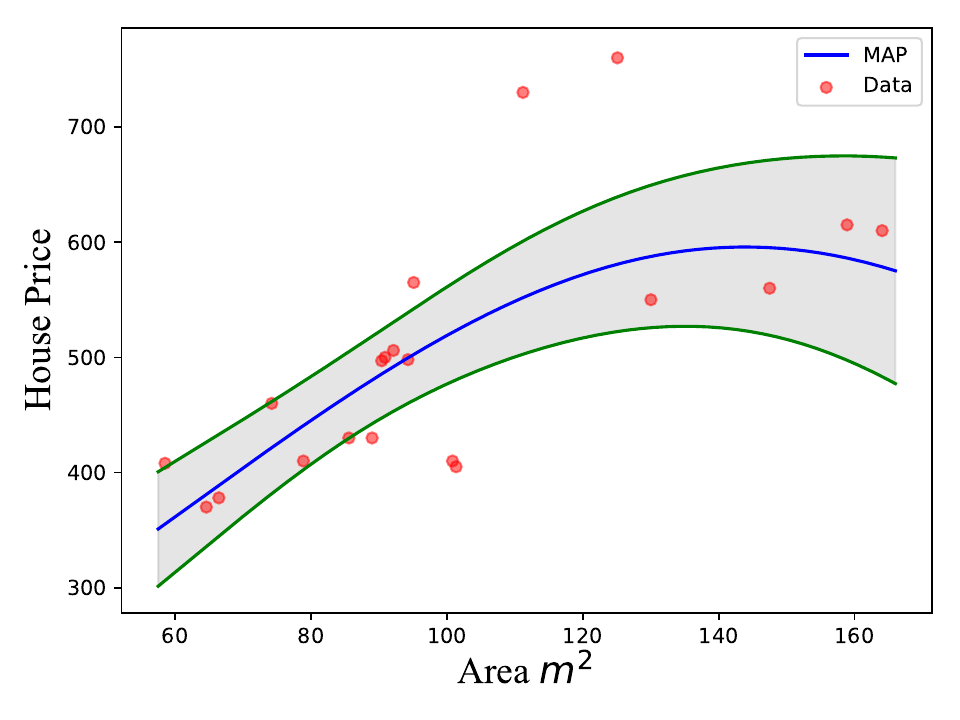}}
\caption{Comparison of using half of the data versus the full data set.}
\label{fig:gphalfandall}
\end{figure}

\begin{figure}[h!]
\centering  
\vspace{-0.35cm} 
\subfigtopskip=2pt 
\subfigbottomskip=2pt 
\subfigcapskip=-5pt 
\subfigure[Realization 1.]{\label{fig:gphalfrd1}
\includegraphics[width=0.475\linewidth]{./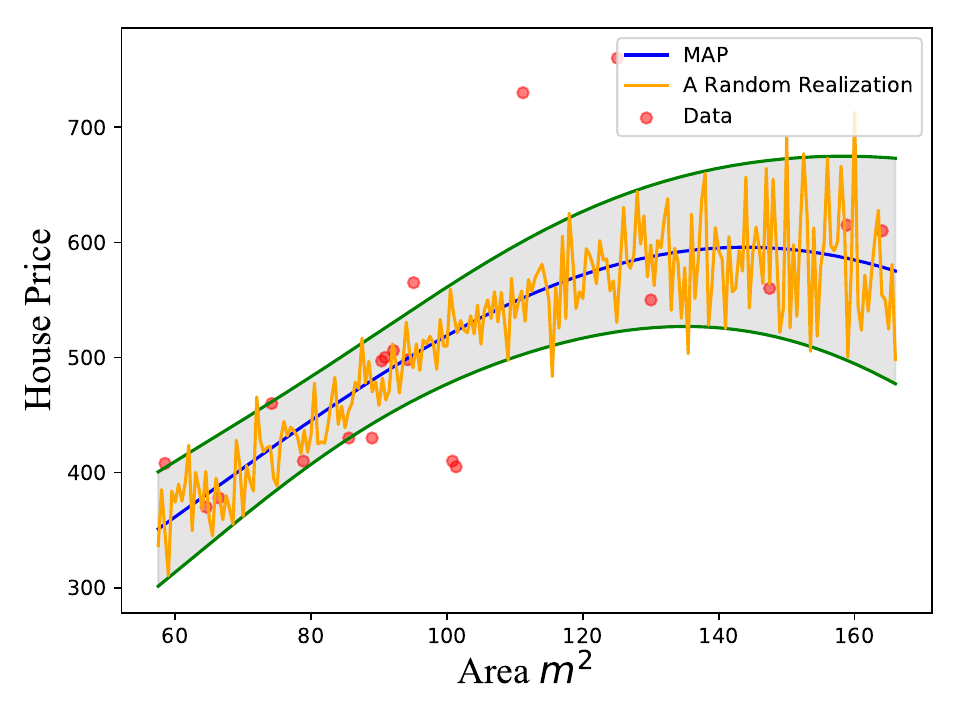}}
\quad 
\subfigure[Realization 2.]{\label{fig:gpallrd2}
\includegraphics[width=0.475\linewidth]{./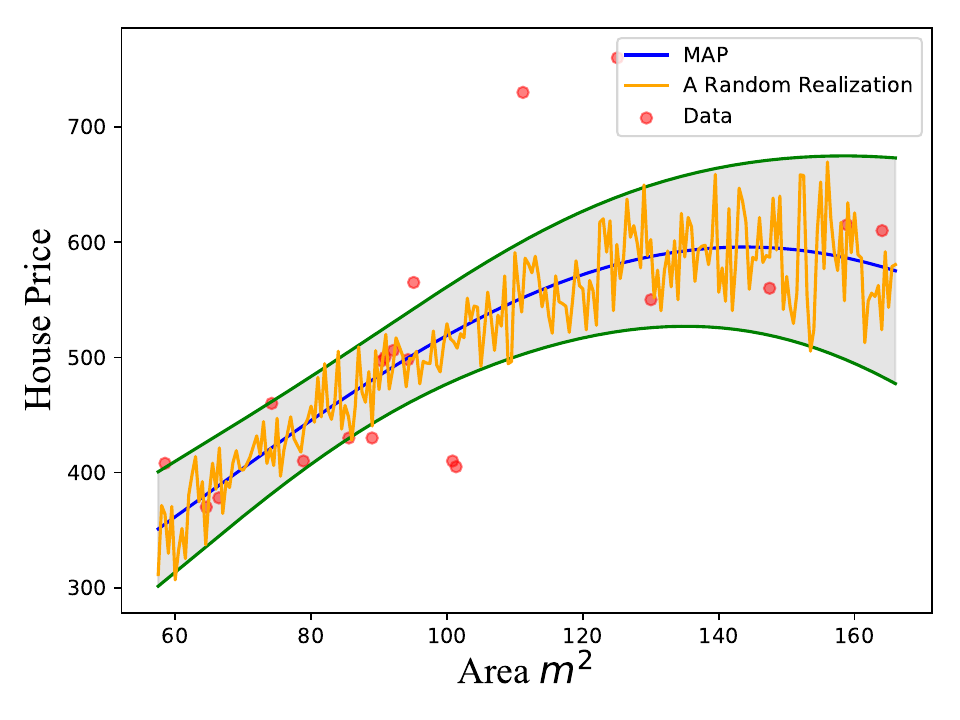}}
\quad
\subfigure[Realization 3.]{\label{fig:gphalfrd3}
\includegraphics[width=0.475\linewidth]{./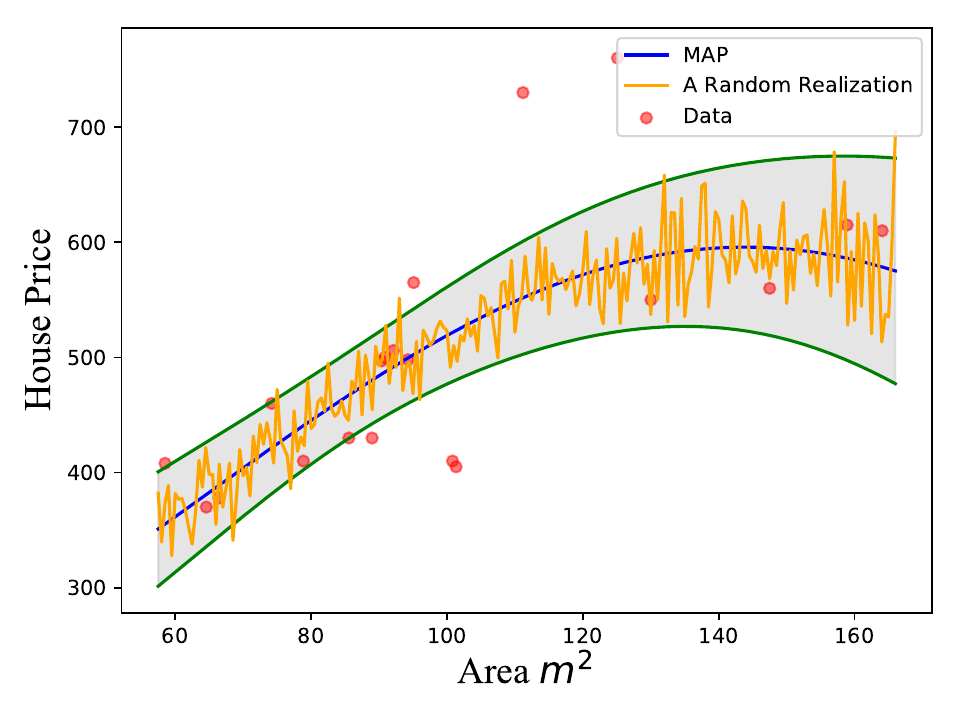}}
\quad 
\subfigure[Realization 4.]{\label{fig:gpallrd4}
\includegraphics[width=0.475\linewidth]{./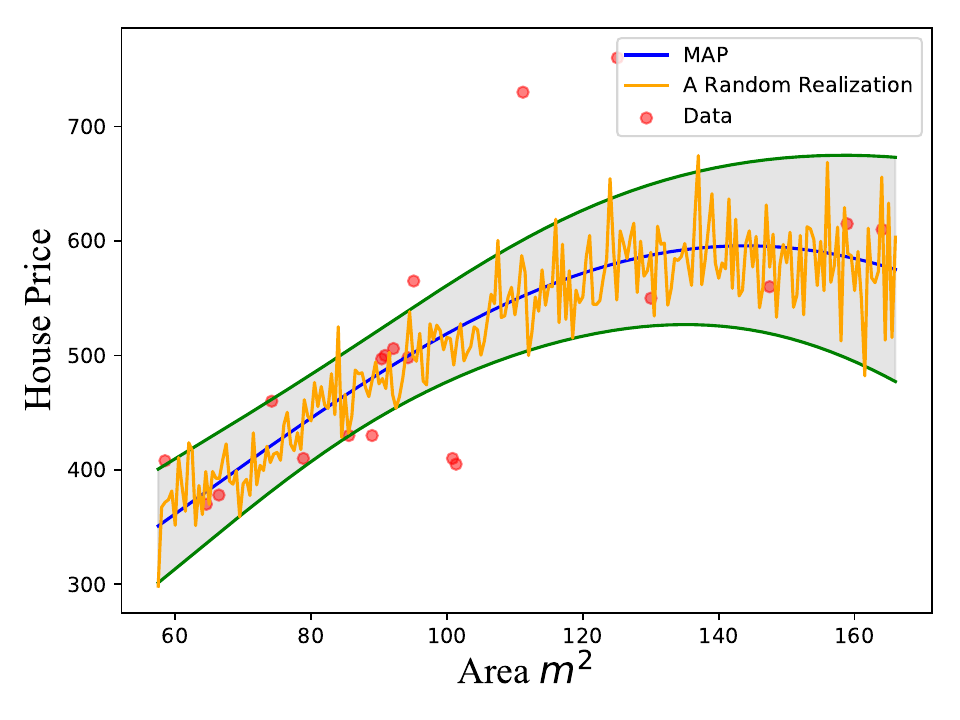}}
\caption{Random realizations from the posterior distribution.}
\label{fig:gphalfandall-rds}
\end{figure}
\begin{figure}[h!]
\centering  
\vspace{-0.35cm} 
\subfigtopskip=2pt  
\subfigbottomskip=2pt  
\subfigcapskip=-5pt  
\subfigure[Repeat the data set once.]{\label{fig:gp_rept1}
\includegraphics[width=0.475\linewidth]{./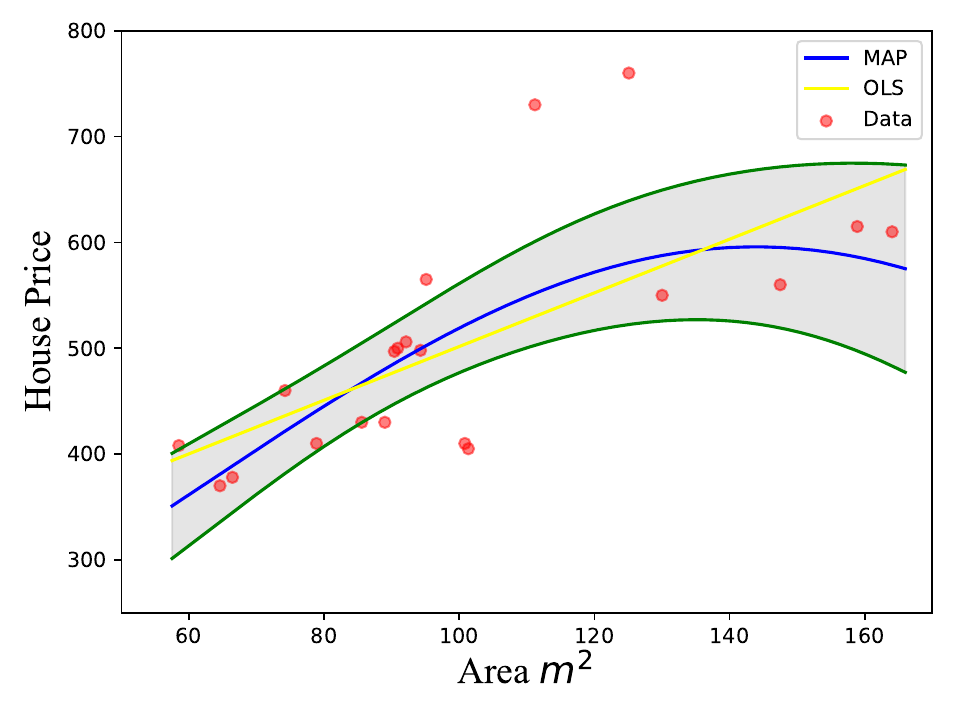}}
\quad 
\subfigure[Repeat the data set twice.]{\label{fig:gp_rept2}
\includegraphics[width=0.475\linewidth]{./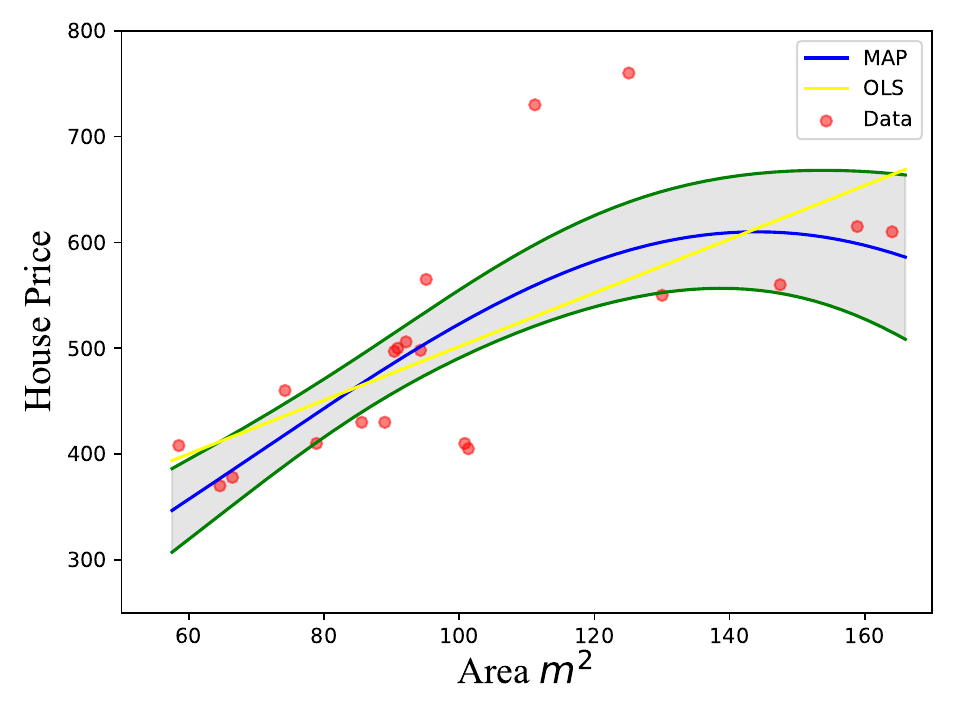}}
\quad
\subfigure[Repeat the data set three times.]{\label{fig:gp_rept3}
\includegraphics[width=0.475\linewidth]{./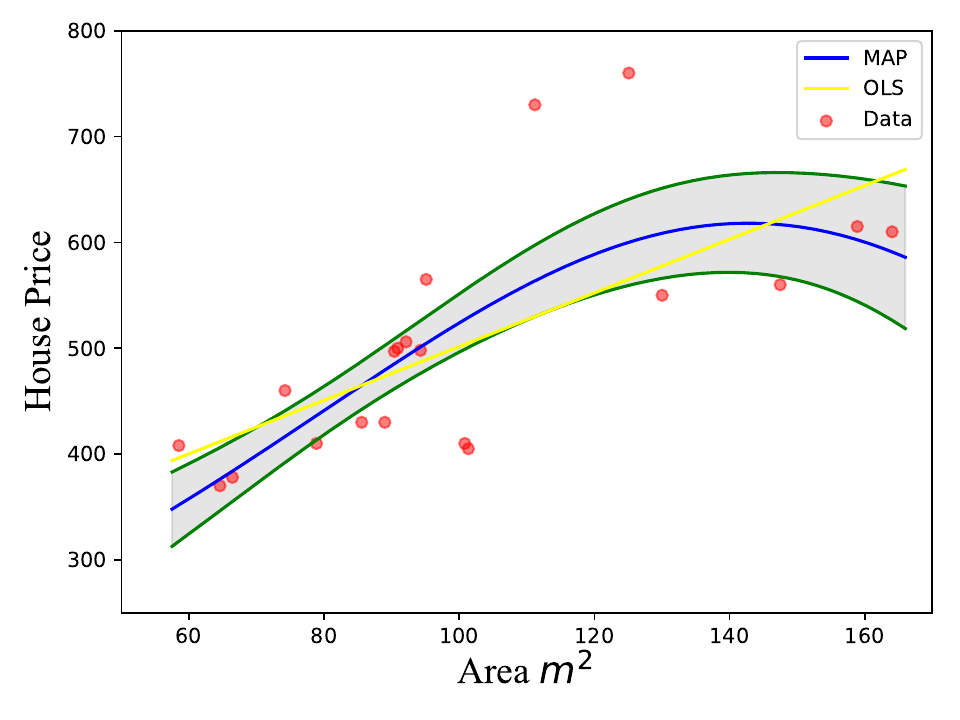}}
\quad 
\subfigure[Repeat the data set four times.]{\label{fig:gp_rept4}
\includegraphics[width=0.475\linewidth]{./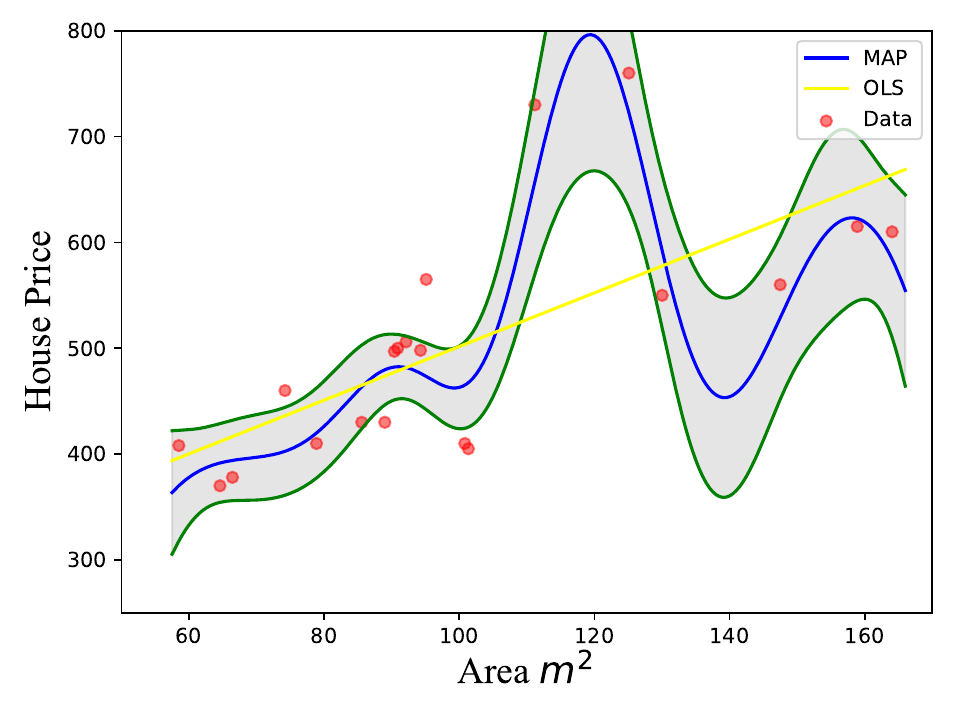}}
\caption{Effect of repeating the data set multiple times.}
\label{fig:gphalfandall-repeat}
\end{figure}
\begin{example}
We consider a dataset where the input variable represents the area of a house, and the output variable corresponds to its monthly rent. To determine the optimal parameters of the Gaussian kernel, we use cross-validation (CV).

In Figure~\ref{fig:gphalfandall}, the red dots represent the training inputs, the blue line signifies the MAP estimate via a GP regressor, and the shaded region indicates the  95\% confidence interval.

In Figure~\ref{fig:gpall}, the GP regressor is trained using the full dataset. 
In contrast, Figure~\ref{fig:gphalf} shows results when only data points corresponding to houses smaller than 100 $m^2$ are used for training.
As expected, the model performs poorly when extrapolating beyond 100 $m^2$, indicating that the estimator trained on limited data cannot generalize well to unseen regions of the input space.

Figure~\ref{fig:gphalfandall-rds} displays four random function realizations (shown by orange lines) drawn from the posterior distribution using the full dataset. These realizations help illustrate the concept of a distribution over functions, which is central to understanding Gaussian processes.

In Figure~\ref{fig:gphalfandall-repeat}, we investigate how increasing the size of the dataset by repetition affects the predictions. Specifically, we repeat the dataset once, twice, three times, and four times. The yellow line in each plot represents the prediction made by ordinary least squares (OLS) regression.
Interestingly, the OLS estimate remains unchanged regardless of how many times the data is repeated. To see why, suppose the original OLS estimate is given by: 
$$
\widehatbbeta_1 = (\bX^\top \bX)^{-1}\bX^\top\by.
$$
Now, if we repeat the data twice, the new estimate becomes:
$$
\begin{aligned}
\widehatbbeta_2 &= \left(\begin{bmatrix}
\bX \\
\bX
\end{bmatrix}^\top \begin{bmatrix}
\bX \\
\bX
\end{bmatrix}\right)^{-1}\begin{bmatrix}
\bX \\
\bX
\end{bmatrix}^\top\begin{bmatrix}
\by \\
\by
\end{bmatrix} 
= \frac{1}{2}(\bX^\top \bX)^{-1} \cdot 2\bX^\top\by=\widehatbbeta_1.
\end{aligned}
$$
This result confirms that repeating the data does not change the OLS estimate. Similarly, repeating the data three or four times also yields the same result. Therefore, OLS treats repeated observations as redundant information.

However, this is not the case for Gaussian process regression. When the dataset is repeated, the GP estimator becomes more confident in its predictions, reflected by a reduction in predictive variance. This behavior highlights a key advantage of Bayesian models: they update their beliefs as more data becomes available.

As discussed in Section~\ref{sec:beta-bernoulli}, Bayesian models incorporate prior knowledge about the parameters, making them particularly effective for regularizing regression problems when data is scarce. The amount of observed data plays a critical role in shaping the posterior distribution, as illustrated in Example~\ref{example:amountofdata}. This example clearly demonstrates the Bayesian foundation of Gaussian process modeling.
\end{example}

\section{Limitations of Gaussian Process from Non-Zero-Mean Prior*}
When the prior mean $\bbeta_0$ is not zero (see Equation~\eqref{equation:semiconju_blm}), and the model assumes Gaussian noise $\bepsilon\sim \normal(\bzero, \sigma^2\bI)$, the predictive distribution becomes:
\begin{equation}\label{equation:non-zeromean-predictive-high}
\begin{aligned}
g_\ast \mid  \bx_\ast \bX, \by, \sigma^2 \sim \normal\Big(&\bphi_\ast^\top \bSigma_0 \bPhi^\top(\bK+\sigma^2\bI)^{-1}\by + {\color{mylightbluetext} \bphiast^\top\Big(\frac{1}{\sigma^2} \bSigma_0 \bPhi^\top\bPhi  + \bI\Big)^{-1}\bbeta_0}, \\
&\bphi_\ast^\top \bSigma_0 \bphi_\ast-\bphiast^\top\bSigma_0 \bPhi^\top(\bK+\sigma^2\bI)^{-1} \bPhi\bSigma_0 \bphiast\Big), \nonumber
\end{aligned}
\end{equation}
where we use the fact that $(\bA\bB)^{-1} = \bB^{-1} \bA^{-1}$ if both $\bA$ and $\bB$ are nonsingular.

Following from Section~\ref{sec:bayesian-zero-mean}, suppose we now assume a non-zero-mean prior $\bbeta \sim \normal(\bbeta_0, \bSigma_0)$ and define $g(\bx) \triangleq \bphi(\bx)^\top \bbeta +\epsilon$ with $\epsilon\sim\normal(0,\sigma^2)$. 
Then, the mean and covariance of the output under this prior are given by:
\begin{equation}
\begin{aligned}
\Exp[g(\bx)] &= \bphi(\bx)^\top \Exp[\bbeta] = \textcolor{mylightbluetext}{\bphi(\bx)^\top \bbeta_0}, \\
\Exp[g(\bx) g(\bx')] &= \bphi(\bx)^\top \Exp[\bbeta \bbeta^\top] \bphi(\bx')=\bphi(\bx)^\top \bSigma_0
 \bphi(\bx'). \nonumber
\end{aligned}
\end{equation}
Based on this, the joint distribution of the training outputs $\by$ and test outputs $\bgast$ can be written as:
$$
\left[
\begin{matrix}
\by  \\
\bgast 
\end{matrix}
\right] \sim \normal 
\left(
\left[
\begin{matrix}
{\color{mylightbluetext}\phi(\bx)^\top\bbeta_0} \\
{\color{mylightbluetext}\phi(\bgast)^\top\bbeta_0}
\end{matrix}
\right] , 
\left[
\begin{matrix}
K(\bX,\bX) +  \sigma^2\bI, & K(\bX,\bXast) \\
K(\bXast, \bX), &  K(\bXast, \bXast)
\end{matrix}
\right] 
\right).
$$
By Gaussian identities, the marginal distribution of test outputs is
$$
\begin{aligned}
\bgast \mid  \bXast \bX, \by \sim \normal(& {\color{mylightbluetext} \phi(\bgast)^\top\bbeta_0}+ K(\bXast, \bX)(K(\bX,\bX)+\sigma^2\bI)^{-1}(\by - {\color{mylightbluetext} \phi(\bx)^\top\bbeta_0  }), \\
&K(\bXast, \bx_\ast)-K(\bXast, \bX)(K(\bX, \bX)+\sigma^2\bI)^{-1} K(\bX, \bXast)). \nonumber
\end{aligned}
$$
Note that in this formulation, we cannot express the mean of $g(\bx)$ explicitly in terms of kernel evaluations alone. That is, there is no equivalent expression in the $z$-dimensional feature space that allows us to apply the kernel trick directly.

This limitation restricts the flexibility of models using non-zero-mean priors, especially in high-dimensional or kernel-defined spaces where working explicitly with features is infeasible. As a result, zero-mean priors are often preferred in practice for their compatibility with kernel methods and computational efficiency.

\begin{problemset}
\item \label{problem:proptionto} Suppose both $p(x)$ and $q(x)$ are probability density functions with 
$$
p(x) \propto q(x).
$$ 
Show that $p(x) = q(x)$ for all $x$.

\item \textbf{Dirichlet-Multinomial.} Following the Beta-Bernoulli model (Section~\ref{sec:beta-bernoulli}), show that the Dirichlet distribution is a conjugate prior for the multinomial distribution.

\item We have shown that the Gamma distribution is a conjugate prior for the precision parameter of a Gaussian distribution. 
Show that the inverse-Gamma distribution (Definition~\ref{definition:inverse_gamma_distribution}) is a conjugate prior for the variance parameter of a Gaussian distribution with a fixed mean parameter.

\item Replace the Gaussian noise in the Bayesian linear model using a zero-mean prior with an exponential noise. Discuss how to construct a Gibbs sampler for this modified model and draw its graphical representation. See also the exponential MLE derivation in Problem~\ref{problem:exponential_mle}.

\item \label{prob:equiv_invg_ga_var} Prove the equivalence between the inverse-Gamma density on the variance parameter and the Gamma density on the prevision parameter of a Gaussian distribution. \textit{Hing: use the Jacobian in the change-of-variables formula.}

\item Following Section~\ref{section:niv_dist}, show that an equivalent conjugate prior on the mean and precision parameters of a Gaussian distribution is the normal-Gamma distribution. Define the normal-Gamma distribution yourself before proceeding with the proof.

\item \textbf{Poisson and conjugacy.} Let $\rx_1, \rx_2, \ldots, \rx_N$ be i.i.d. random variables drawn from the Poisson distribution $\poissondist(\lambda)$. Suppose the prior for $\lambda$ is 
$$
\gammadist(\lambda \mid a, b) = \frac{b^a}{\Gamma(a)} \lambda^{a-1} \exp(-b \lambda) \indicator(\lambda >0).
$$
Derive the posterior distribution of $\lambda$ after observing the data.

\item Derive \eqref{equation:zeromean-predictive} rigorously.
\item Show that the linear kernel, polynomial kernel, and Gaussian kernel introduced in Remark~\ref{remark:kernrls} satisfy  the positive definiteness property.
\end{problemset}

\newpage
\chapter{Generalized Linear Model (GLM)}\label{chapter:glm}
\begingroup
\hypersetup{
linkcolor=structurecolor,
linktoc=page,  
}
\minitoc \newpage
\endgroup

\lettrine{\color{caligraphcolor}I}
In the Gauss-Markov model (Chapter~\ref{sec:lr-gaussian-noise}), for each input observation $(\bx, y)=(\bx_i, y_i)$, $i=1,2,\ldots,n$ with $\bx\in\real^p$, we assume the linear relationship $y=f(\bx)\triangleq \bx^\top\bbeta$.
This relationship is known as the  \textit{linear predictor}. The linear predictor serves as the mean of a Gaussian distribution, meaning the output variable can be described as:
\begin{equation}\label{equation:func_rel}
\Exp[\ry \mid f, \bx] = f(\bx)
\qquad \implies\qquad 
\ry\sim \normal(f(\bx), \sigma^2).
\end{equation}~\footnote{Note again that we use normal fonts of boldface lowercase letters to denote random vectors, and  normal fonts of boldface uppercase letters to denote random matrices. That is, $\rx, \rva, \rmX$ are random scalars, vectors, or matrices; while $x, \ba, \bX$ are scalars, vectors, or matrices. In many cases, the two terms can be used interchangeably; that is, $\rx=x$ denotes a realization of the variable.}
The relationship in \eqref{equation:func_rel} is called the \textit{functional relationship} between predictors and the response, indicating that $p(y\mid \bx, f) = p(y\mid f(\bx))$, with $\Exp[\ry\mid \bx, f] = f(\bx)$, for some function $f \in \mathcalF$ within a suitable set of possible functions.

The Gauss-Markov setup is well-suited for regression problems where the output values are continuous. 
However, when dealing with discrete, binary, or categorical output values, the assumptions of the Gauss-Markov model become less practical and may not hold true.
For instance, consider scenarios where the output represents categories (e.g., types of animals), binary outcomes (e.g., success or failure), or positive values (e.g., individual income or monthly house rent). In these cases, modeling the output as a continuous Gaussian variable does not accurately reflect the nature of the data. The primary limitation arises from the fact that the Gaussian distribution assumes a continuous range of possible outcomes, which is inappropriate for discrete or binary outputs.

To address these limitations, \textit{generalized linear models (GLMs)} offer a more flexible framework \citep{nelder1972generalized, dobson2018introduction, dunn2018generalized, mccullagh2019generalized, jackson2024glm}. GLMs extend the concept of the linear predictor to accommodate various types of response variables by linking the expected value of the response to the linear predictor through a link function. This allows the model to handle different types of distributions from the exponential family, such as binomial for binary outcomes or Poisson for count data.

\index{Heteroskedasticity}
\index{Homoskedasticity}
The relationship between the expected value of the response variable and the linear predictor is modeled via a link function, which can be chosen based on the nature of the response variable.
The variance of the response variable can depend on its mean, allowing for \textit{heteroskedasticity}~\footnote{\textit{Homoskedasticity} refers to a condition in statistics where the variance of the error terms or residuals in a regression model is constant across all levels of the independent variables. When homoskedasticity is present, it implies that the prediction errors do not systematically increase or decrease with changes in the value of the independent variable(s). However, when this condition is violated, and the variance of the residuals changes at different levels of an independent variable, the data are said to exhibit \textit{heteroskedasticity}.}, which is common in non-Gaussian settings.
By incorporating these features, GLMs provide a robust approach for modeling a wide array of data types, including binary, count, and categorical data. This makes them invaluable tools in fields ranging from healthcare (for predicting disease outcomes) to economics (for analyzing consumer behavior), where the responses often do not follow a normal distribution \citep{powers2005predictive, egger2016glm}.

Thus, while the Gauss-Markov linear model provides a solid foundation for understanding relationships in continuous data, GLMs extend this capability to encompass a broader spectrum of applications and data types, enhancing the model's flexibility and applicability across diverse domains.

\index{Logistic regression}
\index{Generalized linear models}
\section{A Motivating Example: Logistic Regression}\label{section:logis_reg}

In this section, we will explore a specific case that is important in its own right and serves as an introduction to the fundamental concepts of GLMs, which we will discuss in greater detail later.
For a dataset in which the response variable $\ry$ is binary, taking values 1 or 0 to represent success or failure (presence or absence), 
the expectation of $\ry$ must fall within the interval $[0,1]$:
\begin{equation}
	\Exp[\ry\mid f,\bx] = f(\bx) \in [0,1].
\end{equation}
Given this requirement, the Bernoulli distribution (Equation~\eqref{equation:bbern_dist}) is the appropriate probability distribution, characterized solely by the probability $\pi$ that $\ry=1$.
We assume a particular distributional form for $y$ (in this case Bernoulli), and would like to connect $\bx$ with a feature of this distribution, namely its expectation, by a function $f$:
\begin{equation}
p(\ry=1\mid f,\bx) =  \Exp[\ry\mid f,\bx] = f(\bx).
\end{equation}
Thus, with the spaces $\mathcal{Y}$ for $\ry$ and $\mathcal{X}$ for $\bx$ established, along with a probability distribution satisfying \eqref{equation:func_rel}, our task is to identify a suitable set of possible functions, $\mathcalF$.

A straightforward approach might be to apply a linear function:
\begin{equation}
	f(\bx) = \bbeta^\top \bx,
\end{equation}
However, such a function may not always be suitable because, for any given value of $\bbeta$, there could be values of $\bx$ that make $\bbeta^\top \bx$ fall outside the interval $[0,1]$. This issue can sometimes be avoided if $\bx$ takes on only certain ranges of values, but this cannot be guaranteed universally. To address this, GLMs introduce a \textit{response function} applied to the linear predictor $\eta = \bbeta^\top \bx$, ensuring it falls within the correct range. Specifically, we choose a function $h : \real \to [0,1]$, setting
\begin{equation}
\Exp[\ry\mid f,\bx] = f(\bx) = h(\bbeta^\top \bx).
\end{equation}
Such a function in the logistic regression is called the \textit{logistic function} (a.k.a., the \textit{sigmoid function}):
$$
h(\eta) \triangleq \text{Sigmoid}(\eta) =  \frac{e^{\eta}}{1+e^{\eta}} = \frac{1}{1+e^{-\eta}}.
$$
Its inverse $h^{-1}(\pi) = \ln\left(\frac{\pi}{1-\pi}\right)$, where $\pi\in(0,1)$, is known as the \textit{logit function}.

\index{Logistic function}
\index{Sigmoid function}
\paragrapharrow{Logit and probit.}

Alternatively, one can use the cumulative distribution function (or any other cumulative distribution function) of the standard Gaussian distribution as the response function:
\begin{align}
f(\bx) &= \Phi(\bbeta^\top \bx)
\qquad \text{and}\qquad 
\Phi^{-1}(f(\bx)) = \bbeta^\top \bx,
\end{align}
where $ \Phi(y) = \int_{-\infty}^{y} \normal(u\mid 0,1)du= \frac{1}{\sqrt{2\pi}} \int_{-\infty}^{y} \exp(-\frac{u^2}{2}) du $ is the cumulative distribution function of a standard Gaussian distribution.
Its inverse, $\Phi^{-1}$, is known as the \textit{probit function}. 
While logit and probit estimators differ when estimated probabilities are very small or close to 1, indicating large sample sizes are needed for accurate inference, both functions yield similar results, especially around probability values of 0.5. For a comparison between the logit and probit functions, see Table~\ref{tab:logit-probit-comparison}.

\begin{table}[h]
\centering
\begin{tabular}{|c|c|c|}
\hline
& Logit & Probit \\
\hline\hline
Response function $ h $ & logistic function & Gaussian cdf \\
\hline
Model & $f(\bx) = \frac{e^{\bbeta^\top \bx}}{1 + e^{\bbeta^\top \bx}}$ & $f(\bx) = \Phi(\bbeta^\top \bx)$ \\
\hline
Inverse & $\text{logit}(f(\bx)) = \bbeta^\top \bx$ & $\Phi^{-1}(f(\bx)) = \bbeta^\top \bx$ \\
\hline
\end{tabular}
\caption{A comparison of the logit and probit models.}
\label{tab:logit-probit-comparison}
\end{table}
\paragrapharrow{Model components.}
As outlined above, the logistic regression model comprises three components:
\begin{itemize}
\item The \textit{linear predictor}: $\eta = \bbeta^\top \bx$ (same as that in the Gauss-Markov model).
\item The  \textit{response function} (here, we use the logistic function):
$\Exp[\ry\mid f,\bx] = f(\bx) = f(\bx) = h(\eta) = \frac{e^\eta}{1 + e^\eta}$ (an identity function in the Gauss-Markov model).
\item The \textit{probability distribution}: $\ry \sim \bernoullidist(f(\bx))$ (a normal distribution in the Gauss-Markov model).
\end{itemize}
The goal of the logistic regression, which is the same as the Gauss-Markov linear model, is to say something about the distribution of $\ry$ at $\bx \in \mathcalX$ for which the response $\ry$ is unobserved.
The estimation of this model lies in obtaining an estimate of $\bbeta$ given a set of data samples.
Common methods for estimating the logistic regression is gradient descent methods, which will not be discussed here; see, for example, \citet{lu2025practical}. 
General estimation methods for GLMs are covered in Section~\ref{section:estima_glms}.

\section{Exponential Families of Distributions and Variants}

Though it may not be immediately obvious at first sight, many of the distributions we discussed earlier---whether discrete or continuous---share some important similarities in terms of their structure and their properties. 
To take advantage of these commonalities, we introduce in this section an additional level of abstraction by considering most of these distributions as special cases of a broader class of probability models known as the \textit{exponential family (EF) of distributions}. 
This approach has a key benefit: once we establish general properties for the exponential family, they will automatically apply to all its specific instances. 

\subsection{Exponential Families of Distributions}
We now define what is meant by an exponential family of distributions:
\begin{subequations}
\begin{definition}[The exponential family (EF) of distributions]\index{Exponential family}
A regular probability distribution is said to be a member of a \textit{$k$-parameter exponential family}, if its probability density (or mass) function can be written in the following form:
\begin{equation}\label{equation:exp_fam_dist}
\begin{aligned}
p(\by \mid  \bphi ) 
&= \exp \left\{ \bphi^\top \bT(\by) - \Gamma(\bphi) + S(\by) \right\}\\
&= \exp \left\{ \sum_{i=1}^k \phi_i T_i(\by) - \Gamma(\phi_1, \ldots, \phi_k) + S(\by) \right\}, \qquad \by \in \mathcalY,
\end{aligned}
\end{equation}
where:
\begin{enumerate}
\item $\bphi = [\phi_1, \ldots, \phi_k]^\top$ is a $k$-dimensional vector of parameters in $\real^k$, referred to as the \textit{natural parameter} or \textit{canonical parameter}.
\item $\bT(\by) = [T_1(\by),T_2(\by), \ldots, T_k(\by)]\in\real^k$ (i.e., a vector-valued mapping from $\mathcalY$ to $\real^k$), where $T_i : \mathcalY \to \real$, $i = 1, \ldots, k$, $S(\by) : \mathcalY \to \real$, and $\Gamma : \real^k \to \real$ are real-valued functions.
\item The sample space $\mathcalY$ does not depend on $\bphi$. 
\item The function $\Gamma(\cdot)$, known as the \textit{log partition function} or \textit{cumulant function}, is defined by the integral $\int_{\mathcalY} \exp\{ \bphi^\top \bT(\by)  + S(\by) \} d\by$.
\end{enumerate}
\end{definition}

\begin{remark}
The fact that there is an exponential in the formula \eqref{equation:exp_fam_dist} is in itself not the most important structural property of an exponential family (since any density function can be written as $f(\by) = \exp(\ln f(\by))$ on its support). 
What truly defines an exponential family is how the density can be factorized into three distinct parts: 
one that only depends on $\bphi$, i.e., $\exp(-\Gamma(\bphi))$; one that only depends on $\by$, i.e., $\exp(S(\by))$; and one that depends on both $\bphi$ and $\by$ but in a very special way: as a linear combination of the coordinates of $\bphi$ with coefficients that are functions of $\by$.
\end{remark}

\begin{remark}
The exponential family of distributions should not be confused with the exponential distribution.
Unfortunately, their names are quite similar, which can lead to confusion. To avoid ambiguity, we will always refer to the exponential \textbf{family} when discussing this broader class, distinguishing it clearly from the exponential \textbf{distribution}.
\end{remark}

We will show that all the distributions that we have so far seen, except for the uniform distribution, are members of some exponential family.
To demonstrate this, we will rewrite each distribution’s density or probability mass function into the standard form given in Equation~\eqref{equation:exp_fam_dist}. 
It will often happen that the \textit{usual parameter} $\btheta$ (for example, the mean $\mu$ and variance $\sigma^2$ in a Gaussian distribution) employed does not coincide with the natural parameter. 
However, there often exists a bijective and twice-differentiable transformation $\eta : \Theta \to \real^k$ such that $\bphi = \eta(\btheta)$. 
Consequently, the cumulant function becomes $\Gamma(\bphi) = \Gamma(\eta(\btheta)) = d(\btheta)$, where $d = \Gamma \circ \eta$

Using this transformation, the exponential family representation can be rewritten in terms of the original parameter (usual parameter) $\btheta$ as follows:
\begin{equation}\label{equation:exp_fam_dist_V2}
\underbrace{\exp \left\{ \sum_{i=1}^k \phi_i T_i(\by) - \Gamma(\bphi) + S(\by) \right\}}_{\text{natural parameterization}}
= 
\underbrace{\exp \left\{ \sum_{i=1}^k \eta_i(\btheta) T_i(\by) - d(\btheta) + S(\by) \right\}}_{\text{usual parameterization}}.
\end{equation}
\end{subequations}

Either formulation can be used, depending on which is most convenient in a specific context: for the purpose of doing theory and proving general results, the \textit{natural representation} (also called \textit{natural parametrization} or \textit{canonical representation}) given by $\exp \left\{ \sum_{i=1}^k \phi_i T_i(\by) - \Gamma(\bphi) + S(\by) \right\}$ is more convenient since  it simplifies the mathematical treatment of the model.

However, the usual parameterization often provides a more intuitive interpretation of the parameters. 
The parameters in the usual parameterization often have a direct relationship with the moments of the data, such as the mean and variance. This can make it easier to understand how changes in the parameters affect the shape of the distribution and the expected behavior of the data.
For example, in a Gaussian distribution, the mean $\mu$ and variance $\sigma^2$ have clear interpretations related to the central tendency and spread of the data. This can be beneficial for practitioners who need to communicate results to non-technical stakeholders.

The canonical parameter $\bphi$ and the original parameters of the exponential family distribution are in a one-to-one mapping relationship. The canonical parameter $\bphi$ can be a scalar parameter or a vector parameter containing two parameters. For single-parameter exponential family distributions, the original parameter is usually the mean $\mu$, and at this time $\phi$ is a function of $\mu$. For two-parameter exponential family distributions, the original parameters are usually the mean $\mu$ and the variance $\sigma^2$, and at this time $\bphi$ is a vector parameter containing these two parameters, and $\bphi$ is a function of $\mu$ and $\sigma^2$.

\begin{example}[Binomial exponential family]
Let $ \rx \sim \binoimial(n, p) $ be a Binomial random variable (Definition~\ref{definition:binomial_distri}), where $ \rx \in \{0, 1, 2, \ldots, n\} $.
Its probability mass function can be written as:
$$
\binom{n}{x} p^x (1-p)^{n-x} = \exp \left\{ \ln \left( \frac{p}{1-p} \right) x + n \ln (1-p) + \ln \binom{n}{x} \right\}.
$$
Define:
$$
\phi \triangleq \ln \left( \frac{p}{1-p} \right), \quad T(x) \triangleq x, \quad S(x) \triangleq \ln \binom{n}{x}, \quad \Gamma(\phi) \triangleq n \ln (1 + e^\phi) = -n \ln (1-p).
$$
Thus, if $ n $ is held fixed and only $ p $ is allowed to vary, the support of $ f $ does not depend on $ \phi $.
Therefore, the Binomial distribution with fixed $n$ belongs to the one-parameter exponential family.
Here the usual parameter $ p $ is a twice differentiable bijection of the natural parameter $ \phi $:
$$
p = \frac{e^\phi}{1 + e^\phi} \qquad \text{and} \qquad \phi = \eta(p) = \ln \left( \frac{p}{1-p} \right).
$$
Note that  $ p \in (0, 1) $ but $ \phi \in \real $. The sample space $\{0, 1, 2, \ldots, n\} $ does not depend on $\phi$.
\end{example}

\begin{example}[Gaussian exponential family]
Let $ \rx \sim \normal(\mu, \sigma^2) $. Its probability density function is:
$$
\frac{1}{\sigma \sqrt{2\pi}} \exp \left\{ -\frac{1}{2} \left( \frac{x-\mu}{\sigma} \right)^2 \right\}
= \exp \left\{  \frac{\mu}{\sigma^2} x -\frac{1}{2\sigma^2} x^2 - \frac{1}{2} \ln (2\pi \sigma^2) - \frac{\mu^2}{2\sigma^2} \right\}.
$$
Define:
$$
\phi_1 \triangleq \frac{\mu}{\sigma^2}, 
\ \ 
\phi_2 \triangleq -\frac{1}{2\sigma^2}, 
\ \
 T_1(x) \triangleq x, 
\ \  
T_2(x) \triangleq x^2, 
\ \ 
S(x) \triangleq 0, 
\ \ 
\Gamma(\phi_1, \phi_2) \triangleq  \frac{\ln \big( -\frac{\pi}{\phi_2} \big)}{2}-\frac{\phi_1^2}{4\phi_2}.
$$
Furthermore, the support of the distribution is always $ \real $, regardless of the values of $\mu$ and $\sigma^2$.
Hence, the normal distribution $ \normal(\mu, \sigma^2) $ is a two-parameter exponential family.
\end{example}

\begin{example}[Counterexample: uniform distribution]
Let $ \rx \sim \uniformdist(\theta_1, \theta_2) $. 
The probability density function $ f(x; \theta_1, \theta_2) $ is positive if and only if $ x \in [\theta_1, \theta_2] $. 
As a result, the support of the distribution depends on the parameters $\theta_1$ and $\theta_2$, which violates one of the key conditions required for membership in the exponential family.
Therefore, the uniform distribution does not belong to the exponential family of distributions. However, it is worth noting that if $\theta_1$ and $\theta_2$ are fixed constants (rather than variables), then the corresponding uniform distribution can technically be expressed in the exponential family form. But this would represent a degenerate case consisting of only a single distribution.
\end{example}

\subsection{Other Exponential Family Forms}\label{section:other_expfam}

We discussed the exponential family of distributions. All probability density (or mass) functions in this family can be written in the following general form:
\begin{subequations}
\begin{equation}\label{equation:ef_def2}
\textbf{(EF)}:\qquad p(\by\mid \bphi) = \exp \left\{ \bphi^\top \bT(\by) - \Gamma(\bphi) + S(\by) \right\}
\end{equation}
In this expression, $\bphi$ is called the natural parameter or canonical parameter, which represents all unknown parameters in the model. Typically, exponential family distributions involve two types of parameters: one related to the \textit{location} (such as the mean), and another related to the \textit{scale} (such as the variance).

In this chapter, we focus on generalized linear models (GLMs) that use a specific subset of the exponential family known as the \textit{natural exponential family (NEF)}. In the NEF, the function $\bT(\by)$ is simply equal to $\by$, meaning the sufficient statistic is just the data itself. This gives us the following simplified form:
\begin{equation}\label{equation:natu_exp_fam}
\textbf{(NEF)}:\qquad p(\by\mid \bphi) = \exp \left\{ \bphi^\top \by - \Gamma(\bphi) + S(\by) \right\} 
\end{equation}
This version is often referred to as the \textit{natural form} or \textit{canonical form} of the exponential family. While most commonly used exponential family distributions can be expressed in this form, there are exceptions---such as the log-normal distribution---that belong to the broader exponential family but cannot be written in the natural form.

\begin{remark}
There is a potential source of confusion here: although the parameter $\bphi$ is already called the canonical (natural) parameter, the entire expression is only said to be in canonical (natural) form if $\bT(\by) = \by$. So both conditions must be satisfied for something to be in canonical form.
\end{remark}

In the exponential family, some distributions have only one parameter, while others have two parameters. The natural parameter $\bphi$ contains all the usual parameters of the distribution. When the distribution has only one parameter, $\phi$ is a scalar parameter. When the distribution has two parameters, $\bphi$ is a two-dimensional vector parameter. The two parameters of the exponential family distribution are related to the mean and variance of the distribution, representing the location and scale, respectively.

The natural form of the exponential family (Equation~\eqref{equation:natu_exp_fam}) contains all relevant parameters together, which can make it cumbersome to work with in practice. To simplify things, we introduce an additional structure by decomposing the parameters and including a separate \textit{dispersion parameter} $\rho$.

\index{Exponential dispersion family}
\index{Natural parameter}
\index{Usual parameter}
\begin{definition}[The exponential dispersion family (EDF) of distributions]
A regular probability distribution is said to be a member of a \textit{exponential dispersion family}, 
if its density (or mass function) can be written as:
\begin{equation}\label{equation:exp_fam_edf}
\textbf{(EDF)}:\qquad p(y\mid \phi, \rho) = \exp \left\{ \frac{ y\phi- b(\phi)}{a(\rho)} + c(y, \rho) \right\},\quad y\in\mathcalY,
\end{equation}
where:
\begin{enumerate}
\item $\phi\in\real$ is called the \textit{natural parameter} or \textit{canonical parameter}.
\item $a:\real\rightarrow \real$, $b:\real\rightarrow \real$, and $c:\real\times \real\rightarrow \real$ are functions. The function $b$ is known as the \textit{log normalizer}, also called the \textit{cumulant function}.
\item $a(\rho)>0$ is the \textit{dispersion function}, which is known, and $\rho$ is the \textit{dispersion parameter}. In many settings, $\rho$ is not the main focus of analysis and may be treated as a ``nuisance" parameter.
\end{enumerate}
\end{definition}

\end{subequations}

\begin{remark}\label{remark:dips_fac_inedf}
In most applications of generalized linear models, we typically assume that $a(\rho) = \rho$. 
However, in some cases, we may use $a(\rho) = \rho / w_i$ for all $i\in\{1,2,\ldots,n\}$, where $w_i$ represents the sample or group weight; see Section~\ref{section:group_glm}. This allows each observation to have a different weight, and the value of $w_i$ is assumed to be known.

The use of weights is not always necessary---only when the specific application requires assigning different importance or precision to individual observations. In such cases, the weights are known in advance; see Example~\ref{example:grouped_logis}. Therefore, many introductory materials on GLMs omit the weights and simply assume $a(\rho) = \rho$.
\end{remark}
Note that Equation~\eqref{equation:exp_fam_edf} forms by decomposing the parameter $\bphi$ from the NEF. 
This decomposition separates the components related to the mean and the variance. As a result:
\begin{itemize}
\item The natural parameter $\phi$ becomes associated only with the mean $\mu$.
\item While the dispersion parameter $\rho$ is associated with the variance.
\end{itemize} 
After this  decomposition, there is a one-to-one functional relationship between the natural parameter $\phi$ and the mean parameter $\mu$, allowing us to convert between them using a link function $g$:
$$
\phi=g(\mu)
\qquad \text{and}\qquad 
\mu = g^{-1}(\phi) \triangleq h(\phi).
$$

For any valid probability distribution, it must integrate (or sum, in the discrete case) to 1 over its entire sample space. That is,
\begin{equation}\label{edf_sumone}
	\int p(y\mid \phi, \rho) \, dy = 1 
\end{equation}
Applying this condition to the EDF form (Equation~\eqref{equation:exp_fam_edf}), we get:
\begin{align}
\int \exp \left[ \frac{y\phi - b(\phi)}{a(\rho)} + c(y, \rho) \right] \, dy &= 1 \nonumber \\
\implies \exp \left\{-\frac{b(\phi)}{a(\rho)}\right\} \int \exp \left( \frac{y\phi}{a(\rho)} + c(y, \rho) \right) \, dy &= 1 \nonumber\\
\implies \frac{b(\phi)}{a(\rho)} = \ln \int \exp \left( \frac{y\phi}{a(\rho)} + c(y, \rho) \right) \, dy.& \label{equation:edf_sumone_cnt}
\end{align}
Equation~\eqref{equation:edf_sumone_cnt} determines $ b $ in terms of dispersion function $ a(\rho) $ and the function $ c(y, \rho) $. 
Thus, $b(\phi)$ is not arbitrary---it is determined by the normalization requirement. It is commonly referred to as the log normalizer, although it's important to remember that its value also depends on the dispersion parameter $\rho$.

\paragrapharrow{Mean of EDF.} If we differentiate \eqref{equation:edf_sumone_cnt} with respect to $ \phi $, we obtain
\begin{equation}
\frac{b'(\phi)}{a(\rho)} = \frac{\int \frac{y}{a(\rho)} \exp \left( \frac{y\phi}{a(\rho)} + c(y, \rho) \right) \, dy}{\int \exp \left( \frac{y\phi}{a(\rho)} + c(y, \rho) \right) \, dy}.
\end{equation}
Now, using Equation~\eqref{equation:edf_sumone_cnt}, we can substitute the denominator on the right-hand side to simplify this expression:
\begin{equation}\label{equation:edf_sumone_mean2}
\begin{aligned}
\frac{b'(\phi)}{a(\rho)} &= \int \frac{y}{a(\rho)} \frac{\exp \left( \frac{y\phi}{a(\rho)} + c(y, \rho) \right)}{\exp \left( \frac{b(\phi)}{a(\rho)} \right)} \, dy 
= \int \frac{y}{a(\rho)} \exp \left( \frac{y\phi - b(\phi)}{a(\rho)} + c(y, \rho) \right) \, dy  \\
&= \int \frac{y}{a(\rho)} p(y\mid \phi, \rho) \, dy 
= \frac{1}{a(\rho)} \Exp[\ry\mid \phi, \rho].
\end{aligned}
\end{equation}
Therefore, we conclude that:
\begin{equation}
b'(\phi) = \Exp[\ry\mid \phi, \rho].
\end{equation}
It turns out that $ b' $ is almost always invertible for finite parameter values, because its derivative $ b'' > 0 $ except when the variance of the distribution is zero (see discussion in \eqref{equation:edf_sumone_var1}). Thus, we can write:
\begin{equation}\label{equation:edf_sumone_mean3}
\mu \triangleq \Exp[\ry\mid \phi, \rho] = b'(\phi)  
\qquad\iff\qquad 
\phi = (b')^{-1}(\mu) \triangleq \psi(\mu).
\end{equation}
This shows that the EDF distribution can therefore be parameterized in terms of the natural parameter $ \phi $ or in terms of the mean $ \mu $, as discussed earlier.

In practice, we define $\psi \triangleq(b')^{-1}$ such that $\psi(\mu) = \phi$, where $\mu$ is the mean, $\phi$ is the natural parameter, and $\psi$ relates the mean with the natural parameter; see the right part of  Figure~\ref{fig:glm_rela}.
The equivalences in Equation~\eqref{equation:edf_sumone_mean3} are especially important from the perspective of functional models---and generalized linear models in particular---because they establish a one-to-one correspondence between the distribution's parameterization and its expected value.

\paragrapharrow{Variance of EDF.}
From \eqref{equation:edf_sumone_mean2}, we have that
$$
b'(\phi) = \exp \left( -\frac{b(\phi)}{a(\rho)} \right) \int y \exp \left( \frac{y\phi}{a(\rho)} + c(y, \rho) \right) dy.
$$
Differentiating again (using the product rule), we obtain:
\begin{equation}\label{equation:edf_sumone_var1}
\begin{aligned}
b''(\phi) &= -\frac{b'(\phi)}{a(\rho)} b'(\phi) + \exp \left( -\frac{b(\phi)}{a(\rho)} \right) \int \frac{y^2}{a(\rho)} \exp \left( \frac{y\phi}{a(\rho)} + c(y, \rho) \right) dy \\
&= -\frac{\mu^2}{a(\rho)} + \frac{1}{a(\rho)} \Exp[\ry^2\mid \phi, \rho]
= \frac{1}{a(\rho)} \Var[\ry\mid \phi, \rho] .
\end{aligned}
\end{equation}
Note that \eqref{equation:edf_sumone_var1} shows that $ b'' \geq 0 $, with equality only if the variance is zero or the dispersion is infinite.
We can now reparameterize in terms of the mean $ \mu $:
\begin{equation}\label{equation:edf_sumone_var2}
\begin{aligned}
\Var[\ry\mid \phi, \rho] &= a(\rho) b''(\phi) 
= a(\rho) b''\big((b')^{-1}(\mu)\big) 
= a(\rho) \mathcalV(\mu),
\end{aligned}
\end{equation}
where 
\begin{equation}
\mathcalV(\cdot) \triangleq b''\big((b')^{-1}(\cdot)\big) 
\end{equation}
is called the \textit{variance function}. 
Equations~\eqref{equation:edf_sumone_mean3} and \eqref{equation:edf_sumone_var2} make it clear why $ a(\rho) $ is called the ``dispersion." Its value does not affect the mean $ \mu = \Exp[\ry\mid \phi, \rho] $, but it scales the variance $ \Var[\ry\mid \phi, \rho] $.

The variance function is defined as the second derivative of the cumulant function $b(\phi)$.
There are two cases for the variance function $\mathcalV(\mu)$ to consider:
\begin{enumerate}[(i)]
\item The variance function is a constant value, $\mathcalV(\mu) = b''(\phi) = \text{constant}$. In this case, the variance of the distribution does not depend on the mean.
\item The variance function is a function of the mean $\mu$, $\mathcalV(\mu) = b''(\phi)$.
\end{enumerate}

The derivation of \eqref{equation:edf_sumone_var2} also shows that since $b(\phi)$ is decomposed from $\Gamma(\bphi)$ in the NEF form of \eqref{equation:natu_exp_fam} by removing $a(\rho)$, the second derivative of $b(\phi)$ no longer represents the variance of the distribution. It needs to be multiplied by $a(\rho)$ again to get the variance of the distribution.

\begin{example}[Exponential, Poisson, Bernoulli, Gaussian in EDF forms]\label{example:edf_examps}
The \textit{exponential distribution} for $x\in [0, +\infty)$ is $p(x\mid \lambda) = \lambda \exp\{ -\lambda x\} = \exp\{ - \lambda x + \ln \lambda\}$,
which is an EDF with
$$
\phi \triangleq - \lambda, 
\qquad a(\rho)\triangleq\rho = 1,
\qquad b(\phi) \triangleq -\ln \lambda=-\ln(-\phi), 
\qquad c(x, \rho) \triangleq 0.
$$

The \textit{Bernoulli distribution} for $ x \in \{0, 1\} $ is
$
p(x\mid \pi)= \pi^x (1 - \pi)^{1-x} = \exp \{ x \ln \frac{\pi}{1-\pi} + \ln(1 - \pi)\},
$
which is an EDF with
$$
\phi \triangleq \ln \frac{\pi}{1-\pi}, 
\qquad a(\rho)\triangleq\rho = 1, 
\qquad b(\phi) \triangleq -\ln(1 - \pi) = \ln(1 + e^\phi),
\qquad c(x, \rho) \triangleq 0.
$$

The \textit{binomial distribution} for $ x \in \{0, 1, 2, \ldots, n\} $ with fixed $n$ is  $\binom{n}{x} \pi^x (1-\pi)^{n-x} = 
\exp \big\{ \ln \big( \frac{\pi}{1-\pi} \big) x + n \ln (1-\pi) + \ln \binom{n}{x} \big\}$,
which is an EDF with
$$
\phi \triangleq \ln \big( \frac{\pi}{1-\pi} \big), 
\quad a(\rho)\triangleq\rho = 1, 
\quad b(\phi) \triangleq -n\ln(1 - \pi)=-n\ln\big( \frac{e^{-\phi}}{1+e^{-\phi}} \big),
\quad c(x, \rho) \triangleq \ln \binom{n}{x}.
$$


The \textit{Poisson distribution} for $ x \in \mathbb{N} $ is
$
p(x\mid \lambda) = \frac{\lambda^x e^{-\lambda}}{x!} = \exp[x \ln \lambda - \lambda - \ln x!],
$
which is an EDF with
$$
\phi \triangleq \ln \lambda,
\qquad a(\rho)\triangleq\rho = 1,
\qquad b(\phi) \triangleq \lambda = e^\phi,
\qquad c(x, \rho) \triangleq -\ln x!.
$$

The \textit{Gaussian distribution} for $ x \in \real $ is
\begin{equation}
\small
\begin{aligned}
p(x\mid \mu, \sigma^2)= \frac{1}{\sqrt{2\pi\sigma^2}} \exp \left\{-\frac{1}{2\sigma^2}(x - \mu)^2\right\}
= \exp \left\{\frac{x\mu - \frac{1}{2}\mu^2}{\sigma^2} - \frac{x^2}{2\sigma^2} - \frac{1}{2}\ln(2\pi\sigma^2)\right\},
\end{aligned}
\end{equation}
which is an EDF with
$$
\phi \triangleq \mu, 
\qquad a(\rho)\triangleq\rho = \sigma^2,
\qquad b(\phi) \triangleq \frac{1}{2}\mu^2 = \frac{1}{2}\phi^2,
\qquad c(x, \rho) \triangleq -\frac{x^2}{2\rho} - \frac{1}{2}\ln(2\pi\rho).
$$
Further results are discussed in Problem~\ref{prob:edf_oths}.

In GLMs, if the parameter $\phi$ is the only unknown parameter in the model, it is referred to as  a \textit{single-parameter model} (which  aligns with the conventional definition of a one-parameter natural EDF in its canonical form). A single-parameter model means that only $\phi$ is an unknown parameter, whereas $\rho$ is considered known. 
Conversely, if both $\phi$ and $\rho$ are unknown, it becomes a \textit{two-parameter model}. Some distributions in the EDF do not incorporate a dispersion parameter. Examples include the Bernoulli distribution, the Poisson distribution, binomial distribution, and the exponential distribution.

Once again, these examples show that the natural parameter $\phi$ is related to the mean of the distribution, and it is a function of the mean. The dispersion parameter $\rho$ is related to the variance of the distribution, and it affects the magnitude of the variance.
\end{example}

\section{Generalized Linear Models (GLMs)}

Recall that we are given:
\begin{itemize}
\item Predictors $\bx \in \mathcal{X} \subseteq \real^p$.
\item A response $y \in \mathcal{Y} \subseteq \real$. This may be numerical; continuous or discrete, or it may be binary. 
\end{itemize}
We suppose that there is some functional relationship between the predictors and the response, i.e. that $p(y\mid \bx, f) = p(y\mid f(\bx))$, with $\Exp[\ry\mid \bx, f] = f(\bx)$, for some $f \in \mathcalF$, a suitable set of possible functions.

One class of models of functional relationships, defined via a set of possible functions $\mathcalF$ and a set of possible probability distributions whose means will be controlled by those functions, are the \textit{generalized linear models (GLMs)}, which we now define.

\subsection{Key Components of GLMs}

When studying statistical analysis, data mining, machine learning, and related fields, the first model we typically encounter is linear regression. In addition to linear regression, there are other models such as logistic regression, Poisson regression, and binomial regression, which are also fundamentally linear in nature. Although these models were developed by different researchers at different times, they all belong to the same model family. In 1972, \citet{nelder1972generalized} introduced a unifying framework called generalized linear models (GLMs). This framework allows for the incorporation of various common regression models, enabling their parameters to be estimated using a unified method.
Generalized linear models rely on the natural form of the exponential dispersion family to construct models for a random variable $\ry$. Different regression models correspond to different distributions of $\ry$ within this family.

Within the GLM framework, we assume that the response variable $\ry$  follows a distribution from the exponential dispersion family. The goal is to predict the value of $\ry$  based on the input variables $\bx$. As a linear model, GLM achieves this by using a linear combination of the input features $\bx$ to make predictions.
A GLM is defined by the following three components:
\begin{subequations}
\begin{enumerate}[(i)]
\item A \textit{linear predictor} or a \textit{systematic component} for each data point $\bx$:
\begin{equation}
\eta = \bbeta^\top \bx + \beta_0.
\end{equation}
That is, $\eta$ is a linear function of $\bx$. To simplify notation, we often append a constant value of 1 to the input vector $\bx$, allowing the intercept or bias term $\beta_0$ to be included in the parameter vector $\bbeta$. With this adjustment, the linear predictor can be expressed more compactly as:
\begin{equation}
\eta = \bbeta^\top \bx.
\end{equation}

\item The distributional assumption or the \textit{random component}: Our knowledge of the response variable $\ry$ given the input $\bx$ and parameters $\bbeta$ is modeled using an EDF, where the parameters  depend on $\bx$ and $\bbeta$:
\begin{equation}\label{equation:glm_comp3}
p(y\mid \bx, \bbeta) = p\big(y\mid \phi(\bx, \bbeta), \rho(\bx, \bbeta)\big) = \exp\left(\frac{y\phi - b(\phi)}{a(\rho)} + c(y, \rho)\right).
\end{equation}
Furthermore, we assume that the values of $\ry$ corresponding to different $\bx$ and $\bbeta$ are independent of each other and other values of $\bx$ (but not of $\bbeta$). That is:
\begin{equation}
	p(\{y_i\}\mid \{\bx_i\}, \bbeta) = \prod_{i=1}^n p(y_i\mid \bx_i, \bbeta).
\end{equation}
where $\{y_i, i=1,2, \ldots,n\}$ represents the response data corresponding to the input vectors $\{\bx_i, i=1,2, \ldots,n\}$.
In simpler terms, the random component involves selecting a suitable probability distribution for the response variable, e.g., discrete or continuous.

\item Recall the initial goal of a linear model: we aim to use the linear predictor $\eta$, derived from the input variables $\bx$, to predict the output variable $\ry$. 
Here, $\ry$ is a random variable following from a distribution from the EDF. 
For a random variable, its value can be any value in its domain space, with each value having a different probability (of course, for a uniformly distributed variable, each value has the same probability). 
However, for prediction purposes, we seek a specific value---typically the expected value of $\ry$, which provides the best estimate in many cases: $\mu\triangleq \Exp[\ry\mid \bx, \bbeta]$.
Now, our task is to derive $\mu$ from $\eta$, and then use $\mu$ as the output value of the model, i.e., the predicted value of $\ry$. 
To do this, we define an \textit{injective response function} (or simply a \textit{response function} or an \textit{activation function}) $h$, such that
\begin{equation}
	\mu = \Exp[\ry\mid \bx, \bbeta] = h(\eta) = h(\bbeta^\top \bx).
\end{equation}
Equivalently, we can write
\begin{equation}
	g(\mu) = \bbeta^\top \bx = \eta.
\end{equation}
where $g \triangleq h^{-1}$ is called the \textit{link function}. Essentially, the link function maps the real-valued linear predictor $\eta$ to the valid range of the mean $\mu$  for the chosen distribution (e.g., a positive space).
Commonly used link and response functions are summarized in Table~\ref{tab:link_functions}.
Because we are choosing a distribution from the EDF, specifying the mean $\mu=h(\bbeta^\top\bx)$ via the response function alone is not sufficient. As shown in Equation~\eqref{equation:edf_sumone_mean3}, we also need a transformation function $\psi$ to map the mean parameter to the natural parameter of the EDF distribution.
\end{enumerate}
\end{subequations}
A generalized linear model extends the classical linear regression model framework by allowing the conditional distribution of the response variable $\ry$ to belong to the broader class of  EDF distributions. Figure~\ref{fig:glm_rela} illustrates the relationships among the variables with the GLM framework.
\begin{figure}[h]
	\centering
	\includegraphics[width=\textwidth]{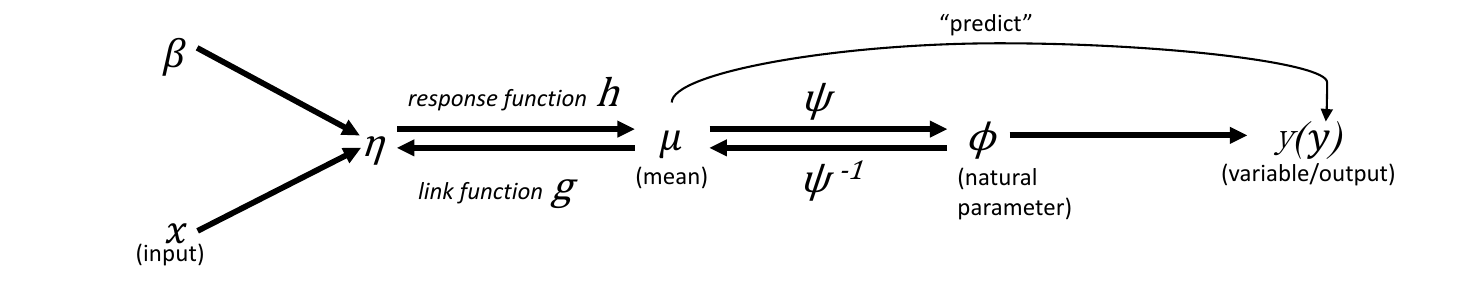}
	\caption{Relationships between variables in a generalized linear model. 
		The input variable $\bx$ and coefficient $\bbeta$ form a linear relationship, $\eta = \bbeta^\top \bx$. $\eta$ is called the linear predictor, and $\bbeta$ is an unknown parameter.}
	\label{fig:glm_rela}
\end{figure}

\paragrapharrow{Connecting the three components.}
In the framework of a generalized linear model, the response variable $\ry$ is treated as a random variable with its probability distribution being an EDF distribution $p(y\mid \phi, \rho) = \exp \left\{ \frac{\phi y - b(\phi)}{a(\rho)} + c(y, \rho) \right\} $, where $\phi$ is the natural parameter of the distribution. 
There exists a one-to-one correspondence between the natural parameter $\phi$ and the mean parameter $\mu$, which we denote by the function $\psi$: $\phi\triangleq \psi(\mu)$.
The linear predictor $\eta=\bbeta^\top\bx$ and the mean $\mu$ of the EDF distribution have a functional relationship, $\mu = g^{-1}(\eta) \triangleq h(\eta)$. 
Therefore, the natural parameter $\phi$ of the EDF distribution can always be transformed into a function of $\eta$, and the probability distribution function of the response variable $\ry$ can be converted into a function related to $\eta$:
\begin{tcolorbox}[colback=white,colframe=black]
\begin{minipage}{1\textwidth}
\begin{equation}\label{equation:gen_glm}
\small
\begin{aligned}
\textbf{(GLM)}:\quad 
p(y\mid \phi, \rho) &= \exp \left\{ \frac{\phi y - b(\phi)}{a(\rho)} + c(y, \rho) \right\} 
= \exp \left\{ \frac{\psi(\mu) y - b(\psi(\mu))}{a(\rho)} + c(y, \rho) \right\} \\
&= \exp \left\{ \frac{\psi(g^{-1}(\eta)) y - b\big\{\psi[g^{-1}(\eta)]\big\}}{a(\rho)} + c(y, \rho) \right\} \\
&= \exp \left\{ \frac{\psi\big(g^{-1}(\bbeta^\top \bx)\big) y - b\big\{\psi\big[g^{-1}(\bbeta^\top \bx)\big]\}}{a(\rho)} + c(y, \rho) \right\} 
= p(y\mid \bx,  \bbeta).
\end{aligned}
\end{equation}
\end{minipage}
\end{tcolorbox}
\noindent
So far, we have connected the input variable $\bx$ and the response variable $\ry$ through their probability distribution functions, resulting in the conditional probability distribution $p(y\mid \bx, \bbeta)$. Equation~\eqref{equation:gen_glm} represents the general form of a generalized linear model.

\paragrapharrow{Link functions.}
As mentioned earlier, in the GLM framework, the function $g$ is called the link function. It connects the linear predictor $\eta$ to the mean $\mu$. The inverse function of the link function $h\triangleq g^{-1}$ is called the response function (a.k.a., the activation function).
There are various options for the link function, depending on the distribution of the response variable:
\begin{itemize}
\item In the Gauss-Markov linear model (i.e., the standard Gaussian linear regression model), the link function is the identity function $\eta = g(\mu) = \mu$.  The variance function is constant, i.e., $\Var[\mu] = 1$, and the dispersion parameter is the variance, i.e., $\rho = \sigma^2$ (see Example~\ref{example:edf_examps}), allowing the use of ordinary least squares in parameter estimation in procedures such as linear regression, analysis of variance (ANOVA) models, or analysis of covariance (ANCOVA) models.

\item In the Poisson distribution, the mean $\mu$ must be positive, so $\eta = \mu$ is no longer applicable because $\eta = \bbeta^\top \bx$ can take any real value. For the Poisson distribution, the link function can choose the log function $\eta = \ln \mu$, at the same time $\mu = e^\eta$ ensures that $\mu$ is positive. 
The variance of the function has the form $\Var[\mu] = \mu$, and the dispersion parameter is 1. Poisson models with a log link function are often referred to as log-linear models, commonly used when there are contingency (data frequency) tables with at least two entries.

\item In a binomial distribution, the response variable takes binary values like 0 and 1 or represents the relative frequency, i.e., $y_i = e_i/n_i$, where $e_i$ is the number of successes and $n_i$ is the number of trials. The mean is a probability ($\pi\equiv \mu$) and therefore must be between 0 and 1. The linear predictor is not bounded. Therefore, the link function must map the real line within the interval [0, 1]. A natural link function for binomial data is the logit link:
$$
\eta = \ln\left(\frac{\pi}{1-\pi}\right) 
\quad\implies\quad  
\pi = \frac{e^\eta}{1 + e^\eta},
\quad \pi\in(0,1).
$$
Another useful alternative for these types of data is the probit link function; see Table~\ref{tab:logit-probit-comparison}:
$$
\eta = \Phi^{-1}(\pi) 
\quad\implies\quad   
\pi = \Phi(\eta),
$$
where $\Phi$ is the cumulative distribution function of a standard normal distribution. The variance of the function has the form $\Var[\pi] = (\pi/(1 - \pi))$ and the dispersion parameter $\rho$---the same as the Poisson distribution---is known and is equal to 1 ($\rho = 1$). 
\end{itemize}
The response function, on the other hand, can be linear or nonlinear. For example, the response function in a standard Gaussian linear regression model is $\mu = g^{-1}(\eta) = \eta$, and the response function in a logistic regression model is $\mu = g^{-1}(\eta) = \text{sigmoid}(\eta)$; see Section~\ref{section:logis_reg}.
Several common link and response functions are summarized in Table~\ref{tab:link_functions}, and typical types of GLMs are outlined in Tables~\ref{tab:common_glmmodels1} and \ref{tab:common_glmmodels2}.

\begin{table}[h]
\centering
\caption{Common link functions of $g$, and their inverse functions---the response function $h$. {Note that $\Phi$ is the cumulative distribution function of a standard normal distribution; $\mu$ is the expected values of the response; $\eta$ is the linear predictor; and $\phi$ is the dispersion parameter}.}
\label{tab:link_functions}
\resizebox{1\textwidth}{!}{
\begin{tabular}{|c|c|c|c|}
\hline
\textbf{Name} & \textbf{Link function ($g$)} & \textbf{Response function ($h$)} & \textbf{Domain of $\mu$} \\
\hline\hline
Identity & $\eta = \mu$ & $\mu = \eta$ & $\mu \in \real$ \\
\hline
Log & $\eta = \ln(\mu)$ & $\mu = e^\eta$ & $\mu > 0$ \\
\hline
Logit & $\eta = \ln\left\{\frac{\mu}{1 - \mu}\right\}$ & $\mu = \frac{e^\eta}{1 + e^\eta}$ & $\mu \in (0, 1)$ \\
\hline
Probit & $\eta = \Phi^{-1}(\mu)$ & $\mu = \Phi(\eta)$ & $\mu \in (0, 1)$ \\
\hline
Negative binomial($\alpha$) & $\eta = \ln\left\{\frac{\mu}{\mu + 1/\alpha}\right\}$ & $\mu = \frac{e^\eta}{\alpha(1 - e^\eta)}$ & $\mu > 0$ \\
\hline
Log-complement & $\eta = \ln(1 - \mu)$ & $\mu = 1 - e^\eta$ & $\mu < 1$ \\
\hline
Log-log & $\eta = -\ln\{-\ln(\mu)\}$ & $\mu = \exp\{-\exp(-\eta)\}$ & $\mu \in (0, 1)$ \\
\hline
Complementary log-log & $\eta = \ln\{-\ln(1 - \mu)\}$ & $\mu = 1 - \exp\{-\exp(\eta)\}$ & $\mu \in (0, 1)$ \\
\hline
Reciprocal & $\eta = 1/\mu$ & $\mu = 1/\eta$ & $\mu \in \real, \mu\neq 0$ \\
\hline
Power($\alpha = -2$) & $\eta = 1/\mu^2$ & $\mu = 1/\sqrt{\eta}$ & $\mu > 0$ \\
\hline
Power($\alpha$) & $\begin{cases} 
\eta = \mu^\alpha & \alpha \neq 0 \\
\eta = \ln(\mu) & \alpha = 0 
\end{cases}$ & $\mu = \begin{cases} 
\eta^{1/\alpha} & \alpha \neq 0 \\
\exp(\eta) & \alpha = 0 
\end{cases}$ & $\mu \in \real$ \\
\hline
Odds power($\alpha$) & $\begin{cases} 
\eta = \frac{[\mu/(1-\mu)]^\alpha - 1}{\alpha} & \alpha \neq 0 \\
\eta = \ln\left(\frac{\mu}{1-\mu}\right) & \alpha = 0 
\end{cases}$ & $\mu = \begin{cases} 
\frac{(1+\alpha\eta)^{1/\alpha}}{1+(1+\alpha\eta)^{1/\alpha}} & \alpha \neq 0 \\
\frac{e^\eta}{1+e^\eta} & \alpha = 0 
\end{cases}$ & $\mu \in (0, 1)$ \\
\hline
\end{tabular}
}
\end{table}

\begin{table}[h]
\centering
\renewcommand{\arraystretch}{1.25}
\caption{Common types of GLMs (Part 1/2).}
\label{tab:common_glmmodels1}
\resizebox{1\textwidth}{!}{
\begin{tabular}{|c|c|c|c|}
\hline
& Gaussian $\normal(\mu, \sigma^2)$                                                             & Exponential $\exponential(\lambda)$  & Categorical $Cat(K, \mu)$                                                                                                                                                                                                                                                                                               \\ \hline\hline
Range of $\ry$               & real: $(-\infty, +\infty)$                                                                    & Nonnegative: $[0, +\infty)$          & $\{1, 2,\ldots, K\}$                                                                                                                                                                                                                                                                                                    \\ \hline
$p(\ry)$                     & $\frac{1}{\sqrt{2\pi\sigma^2}}\exp\left\{-\frac{(y-\mu)^2}{2\sigma^2}\right\}$                & $\lambda \exp\{ -\lambda y\}$        & $\prod_k \mu_k^{y_k}$                                                                                                                                                                                                                                                                                                   \\ \hline
EDF                          & $\exp\left\{\frac{\mu y - \mu^2}{\sigma^2} - \frac{y^2}{2\sigma^2} - \ln2\pi\sigma^2\right\}$ & $\exp\{ - \lambda y + \ln \lambda\}$ &\parbox{10em}{$\exp\Big\{\sum_{k=1}^{K-1} x_k \ln \left(\frac{\mu_k}{\mu_K}\right)$ \\$+\ln \left(1 - \sum_{k=1}^{K-1} \mu_k\right)\Big\}$} \\ \hline
$\phi = \psi(\mu)$           & $\phi = \mu$                                                                                  & $\phi = \ln (\mu)$                   & $\theta_k = \ln \left(\frac{\mu_k}{\mu_K}\right)$                                                                                                                                                                                                                                                                       \\ \hline
$\mu = \psi^{-1}(\phi)$      & $\mu = \phi$                                                                                  & $\mu = e^{\phi}$                       & $\mu_k = \frac{e^{\theta_k}}{\sum_{j=1}^K e^{\theta_j}}$                                                                                                                                                                                                                                                                \\ \hline
$b(\phi)$                    & $\frac{\phi^2}{2}$                                                                            &    $-\ln (\phi)$                & $\ln \left(\sum_{k=1}^K e^{\theta_k}\right)$                                                                                                                                                                                                                                                                            \\ \hline
Link name                    & Identity                                                                                      & Reciprocal                              & Logit                                                                                                                                                                                                                                                                                                                   \\ \hline
Link function                & $\eta = \mu$                                                                                  & $\eta = 1/\mu$                       & $\eta_k = \ln \left(\frac{\mu_k}{\mu_K}\right)$                                                                                                                                                                                                                                                                         \\ \hline
Mean function                & $\mu = \eta$                                                                                  & $\mu$                                & $\mu_k = \frac{e^{\eta_k}}{\sum_k e^{\eta_k}}$                                                                                                                                                                                                                                                                          \\ \hline
$\mathcalV(\mu) = b''(\phi)$ & $1$                                                                                           &           $-1/(\ln \mu)^2$                   & $\mu_k (1 - \mu_k)$                                                                                                                                                                                                                                                                                                     \\ \hline
$a(\rho)$                    & $\sigma^2$                                                                                    &        1                        & 1                                                                                                                                                                                                                                                                                                                       \\ \hline
\end{tabular}
}

\vspace{10pt} 

\centering
\renewcommand{\arraystretch}{1.15}
\caption{Common types of GLMs (Part 2/2).}
\label{tab:common_glmmodels2}
\resizebox{1\textwidth}{!}{
\begin{tabular}{|c|c|c|c|}
\hline
& Poisson $\poissondist(\mu)$        & Bernoulli $\bernoulli(\mu)$                                        & Binomial $\binomialdist(N, \mu)$     \\ \hline\hline
Range of $\ry$                & integer $0, 1, 2, \ldots$     & $\{0, 1\}$                                                & $\{0,1, \ldots, N\}$       \\ \hline
$p(\ry)$                      & $\exp\{y \ln \mu - \ln \mu\}$ & $\mu^y(1 - \mu)^{1-y}$                                    & $\binom{N}{y}\mu^y(1 - \mu)^{N-y}$                                \\ \hline
EDF                       & $\exp\{y \ln \mu - \ln \mu\}$ & $\exp\left\{y\ln\frac{\mu}{1-\mu} + \ln(1-\mu)\right\}$   & $\exp\left\{\frac{y\ln\frac{\mu}{1-\mu} + \ln(1-\mu)}{1/N}\right\}$ \\ \hline
$\phi = \psi(\mu)$      & $\phi = \ln \mu$            & $\phi = \ln\left(\frac{\mu}{1-\mu}\right) = \text{logit}(\mu)$ & $\phi = \ln\left(\frac{\mu}{1-\mu}\right)$                      \\ \hline
$\mu = \psi^{-1}(\phi)$ & $\mu = e^\phi$              & $\mu = \frac{1}{1+e^{-\phi}} = \text{sigmoid}(\phi)$         & $\mu = \frac{1}{1+e^{-\phi}}$                                   \\ \hline
$b(\phi)$               & $e^\phi$                    & $\ln(1 + e^\phi)$                                       & $\ln(1 + e^\phi)$      \\ \hline
Link name                 &  Log                             & Logit                                                     & Logit                    \\ \hline
Link function             &    $\ln(\mu)$                           & $\eta = \ln\left(\frac{\mu}{1-\mu}\right)$                & $\eta = \ln\left(\frac{\mu}{1-\mu}\right)$                        \\ \hline
Mean function             &   $\mu$                          & $\mu = \frac{1}{1+e^{-\eta}}$                             & $\mu = \frac{N}{1+e^{-\eta}}$                                     \\ \hline
$\mathcalV(\mu) = b''(\phi)$    & $\mu$                            & $\mu(1 - \mu)$                                            & $\mu(1 - \mu)$           \\ \hline
$a(\rho)$                 & 1                             & $1$                                                       & $\frac{1}{N}$            \\ \hline
\end{tabular}
}
\end{table}

\index{Natural link}
\index{Canonical link}
\subsection{The Natural/Canonical Link}\label{section:glm_natural_cacno_link}

There is a particular choice of the response (or equivalently, link) function that significantly simplifies the model formulation.
This is known as the \textit{natural link} or \textit{canonical link}.
Recall that we have the following two expressions for the mean:
\begin{subequations}\label{equation:glm_mu_all}
\begin{align}
	\mu &= \Exp[\ry\mid \phi, \rho] = b'(\phi)  \label{equation:glm_mu_v1} ; \\
	\mu &= \Exp[\ry\mid \bx, \bbeta] = h(\bbeta^\top \bx) = h(\eta) \label{equation:glm_mu_v2},
\end{align}
\end{subequations}
where \eqref{equation:glm_mu_v1} holds because $p(y\mid \phi, \rho)$ follows an EDF distribution (see Section~\ref{section:other_expfam}), and \eqref{equation:glm_mu_v2} holds by the definition of a GLM. Following \eqref{equation:glm_mu_all}, we have that
\begin{equation}
	\phi = (b')^{-1}(\mu)  \triangleq \psi(\mu) = \psi\big(h(\bbeta^\top \bx)\big).
\end{equation}
The \emph{natural link} (or \textit{canonical link}) is defined by choosing  $h \triangleq b'$, or equivalently $g \triangleq \psi$, resulting in the equality
\begin{equation}
	\phi = \bbeta^\top \bx = \eta = g(\mu) = \psi(\mu).
\end{equation}
As a result, the general GLM form in Equation~\eqref{equation:gen_glm} becomes
\begin{equation}
\textbf{(Natural GLM)}:\qquad 	p(y\mid \bx, \bbeta) = \exp \left\{ \frac{(\bbeta^\top \bx) y - b(\bbeta^\top \bx)}{a(\rho)} + c(y, \rho) \right\} .
\end{equation}
It is evident that this choice greatly simplifies the model formulation. However, other link functions can still be used when necessary, particularly since the natural link may sometimes exhibit undesirable properties.
Using the canonical link function provides several statistical advantages, the most immediate being the simplification of parameter estimation procedures.

\begin{exercise}
Following Example~\ref{example:edf_examps}, derive the corresponding forms for each distribution using the natural link function.
\end{exercise}

\begin{example}[Standard Gauss-Markov linear model]
The standard (Gauss-Markov) linear regression model assumes that the response variable $\ry$ follows a Gaussian distribution. The probability density function of the Gaussian distribution in the form of an EDF is expressed as:
\begin{equation}
	p(y \mid \mu, \sigma^2) = \exp \left\{ \frac{y \mu - \frac{1}{2} \mu^2}{\sigma^2} - \frac{y^2}{2 \sigma^2} - \frac{1}{2} \ln(2 \pi \sigma^2) \right\}
\end{equation}
Example~\ref{example:edf_examps} shows that the standard terms are:
\begin{align*}
	\phi &= \mu;  \qquad &b(\phi) &= \frac{1}{2} \phi^2; \\
	a(\rho) &=\rho= \sigma^2;   \qquad &c(y, \rho) &=- \frac{y^2}{2 \rho} - \frac{1}{2} \ln(2 \pi \rho).
\end{align*}
It can be seen that the natural parameter $\phi$ and its expectation $\mu$ have a linear relationship, i.e., $\phi = \mu$. In the standard linear regression model, the link function is also a linear function, so the standard linear regression model uses the natural link function. At this point, $\phi = \mu = \eta = \bbeta^\top \bx$, and the model prediction value $\widehaty$ is:
\begin{equation}
	\widehaty= \Exp[\ry \mid \bx, \bbeta] = \mu = \eta = \bbeta^\top \bx .
\end{equation}
However, the standard linear regression model may not be appropriate when it is unreasonable to assume normality of the data or when the range of the response variable is restricted. Additionally, in many practical situations, the assumption of homoskedasticity (constant variance) does not hold, further limiting the applicability of the standard linear regression model.

Once again, GLMs extend the standard linear regression framework by relaxing these assumptions. GLMs allow us to choose an appropriate EDF distribution based on the nature of the response variable $y$, and to use a suitable link function that maps the real-valued linear predictor $\eta$ to the domain of $y$. 
\end{example}

\subsection{Grouped Data}\label{section:group_glm}

We have seen that in a GLM, the expected value of the response variable depends only on the natural parameter $\phi$, which in turn is a function of $\bbeta$ and $\bx$ through the linear predictor $\eta = \bbeta^\top \bx$.

In principle, the dispersion $\rho$ could also vary with the input features $\bx$ or otherwise differ from data point to data point. 
In practice, however, it is typically assumed to be constant across all observations: e.g., for Poisson, exponential, Bernoulli, binomial, where $a(\rho) = 1$, or Gaussian, where $a(\rho) = \sigma^2$; see Example~\ref{example:edf_examps}. 
A common exception occurs in grouped data settings, where multiple responses may be observed for the same input $\bx$.

If $ p(y_r \mid  \phi, \rho) $ is an EDF for each $ r \in [1, 2, \ldots, m] $, with natural parameter $ \phi $, log normalizer $ b $, dispersion $ \rho $, and function $ c $. 
Then, the distribution of the average response from the group data,
\begin{equation}
y \triangleq \overline{y} = \frac{1}{m} \sum_r y_r,
\end{equation}
has a probability distribution that is also an EDF. This EDF has the same natural parameter $ \phi $ and log normalizer $ b $ as the original distribution, but $ \rho $ is replaced by $ \frac{\rho}{m} $ and the function $ c $ may be different and a function of $ m $.

Although grouping may not always be feasible when dealing with continuous predictors, it is often beneficial to group data whenever possible, because:
\begin{itemize}
\item it simplifies the equations.
\item it improves speed of convergence and hence computation time.
\item some theory only holds when $ m \gg 1 $.
\end{itemize}

\begin{example}[Grouped logistic regression]\label{example:grouped_logis}
Let $\mu\equiv \pi(\bx) \triangleq \text{sigmoid}(\bbeta^\top\bx)= \frac{e^{\bbeta^\top \bx}}{1 + e^{\bbeta^\top \bx}}$, and suppose that there are several binary values for each $\bx$. 
That is, our data are structured as:
\begin{equation}
\left\{ (\bx_i, \{y_{ir}\}_{r \in [1,2,\ldots,m_i]}) \right\}, \quad \forall\, i\in\{1,2,\ldots,n\},
\end{equation}
where $m_i$ denotes the number of replicates in group $i$, indexed by $r$; $n$ is the total number of groups, and $M = \sum_i m_i$ is the overall sample size.

If our data only consists  of the total counts $\widehat{y}_i \triangleq \sum_{r=1}^{m_i} y_{ir}$,
then although the individual $y_{ir}\sim\bernoullidist(\pi(\bx_i))$, i.e., Bernoulli-distributed with parameter $\pi(\bx_i)$, the sum $\widehat{y}_i$'s follows a  binomial distribution with parameters $m_i$ and $\pi(\bx_i)$ (Exercise~\ref{exercise:bern_binom}), that is:
\begin{equation}
\widehat{\ry}_i = \sum_r \ry_{ir} \sim \binomialdist(m_i, \pi(\bx_i)).
\end{equation}
However, it is often more convenient to model proportions or averages instead of raw counts. Specifically, we define:
\begin{equation}
\ry_i \triangleq \frac{1}{m_i} \bar{\ry}_i = \frac{1}{m_i} \sum_r \ry_{ir} \sim \frac{1}{m_i} \binomialdist(m_i, \pi(\bx_i)),
\end{equation}
where the distribution, corresponding to a binomial variable divided by the number of trials, is known as the \textit{scaled  binomial distribution}; see Exercise~\ref{exercise:scaled_binom}.

The reason for preferring the scaled values $y_i$ over the counts  $\widehat{y}_i$ is that the expectation retains a familiar form:
\begin{equation}
\Exp[\ry \mid  m, \bx] = \frac{1}{m} m \pi(\bx) = \pi(\bx),
\end{equation}
meaning that the expectation function can still be modelled in the same way as for binary regression, using, for example, the logistic function. There is thus no need to include $m$ at the level of the expectation, only in the distribution.
It turns out this scaled binomial distribution also belongs to an EDF distribution:
\begin{align*}
\Pr(y \mid m, \pi) &= \binom{m}{my} \pi^{my} (1 - \pi)^{m - my} \\
&= \exp \left\{ my \ln \pi + (m - my) \ln (1 - \pi) + \ln \binom{m}{my} \right\}  \\
&= \exp \left\{ \frac{y \ln \frac{\pi}{1 - \pi} + \ln (1 - \pi)}{\frac{1}{m}} + c(y, \frac{1}{m}) \right\} ,
\end{align*}
which is an EDF with
\begin{align*}
\phi &\triangleq \ln \frac{\pi}{1 - \pi},
\qquad  a(\rho)\triangleq  \frac{1}{m} ,
\qquad b(\phi) \triangleq \ln(1+e^\phi),
\qquad c(y, \frac{1}{m})  \triangleq \ln \binom{m}{my}.
\end{align*}
Thus, the natural parameter $\phi$ is the identical to that in the Bernoulli case (Example~\ref{example:edf_examps}), but $\rho = 1$ is replaced by $\rho = 1/m$. The dispersion thus depends on $m$. This implies that if  different groups $i$ have different numbers of replicates $m_i$, the dispersion will vary accordingly across groups: $\rho_i = \frac{1}{m_i}$.
\end{example}

\section{Model Estimation for GLMs}\label{section:estima_glms}

Suppose we are given a dataset that we would like to model using a GLM. After examining the data---possibly through exploratory data analysis---we have already selected a specific type of GLM that is most appropriate for the dataset (for instance, a Poisson GLM). 
Our next goal is to estimate the model parameters---in particular, to find the value of $\bbeta$ that best fits the data. To do this, we can apply the method of maximum likelihood estimation (MLE) to obtain an estimate $\widehatbbeta$. This estimate can then be used to make predictions:
$$
y_{\new} = \Exp[\ry \mid \bx_{\new}, \widehatbbeta].
$$

One major advantage of the GLM framework is that it allows us to derive a general solution for maximum likelihood estimation that applies to all types of GLMs. This means we don't need to develop separate parameter estimation methods for each specific GLM---such as the Poisson GLM or the exponential GLM. Instead, we can formulate a single, unified solution for the entire class of GLMs, and the estimation procedures for individual models become special cases of this general approach.

In this section, we will begin by introducing the score function and its corresponding score equation for GLMs. We will then explore how to solve the score equation using the Fisher information matrix, often within an iterative algorithm. Additionally, we will examine the theoretical properties of both the score function and the Fisher information matrix.

\paragrapharrow{Likelihood function.}
Consider the grouped data setup where we have predictors and data with possible replicates $\{(\bx_i, \{y_{ir}\}_{r \in [1,2,\ldots,m_i]})\},\ \forall\, {i \in \{1,2,\ldots,n\}}$ (see Section~\ref{section:group_glm}). Recall that under a GLM, given predictors $\{\bx_i\}_{i \in [1,2,\ldots,n]}$, each response $y_{ir_i}$ is independent of the other $y_{ir_j}$, and of the values of all predictors $\bx_j$ with $j \neq i$.
Therefore, the joint probability of the data---that is, the likelihood under the EDF distribution (see \eqref{equation:glm_comp3})---is given by
\begin{equation}\label{equation:glm_likeli}
\mathcalL(\bbeta) = p(\{y_{ir}\} \mid \{\bx_i\}, \bbeta) = p(\{y_{ir}\} \mid \{\phi_i\}, \rho) = \prod_{i=1}^{n} \prod_{r=1}^{m_i} p(y_{ir} \mid  \phi_i, \rho),
\end{equation}
where $p(y_{ir} \mid  \phi_i, \rho) = \exp \big( \frac{y_{ir} \phi_i - b(\phi_i)}{\rho} + c(y_{ir}, \rho) \big)$~\footnote{To abuse the notation a bit, we let $a(\rho)=\rho$ for each $\{\bx_i, y_{ir}\}$ data, the form for $a(\rho)$ will be recovered for each group shortly.} and (see Figure~\ref{fig:glm_rela})
\begin{align*}
\phi_i &= \psi(\mu_i) = \psi\big(h(\eta_i)\big) = \psi\big(h(\bbeta^\top \bx_i)\big),
\qquad \psi \triangleq (b')^{-1}.
\end{align*}

\paragrapharrow{Log-likelihood function.}
The  log-likelihood function is thus given by
\begin{equation}\label{equation:glm_loglikeli}
\begin{aligned}
\ell(\bbeta) &\triangleq \ln \mathcalL(\bbeta) = \ln p(\{y_{ir}\} \mid \{\phi_i\}, \rho) 
= \sum_{i=1}^n \sum_{r=1}^{m_i} \left( \frac{y_{ir} \phi_i - b(\phi_i)}{\rho} + c(y_{ir}, \rho) \right) \\
&\triangleq \sum_{i=1}^n \left(  \frac{y_i \phi_i - b(\phi_i)}{\rho/m_i} + \sum_{r}^{m_i} c(y_{ir}, \rho) \right)
= \sum_{i=1}^n \ell_i,
\end{aligned}
\end{equation}
where we have defined
\begin{align*}
y_i &\triangleq \frac{1}{m_i} \sum_{r} y_{ir},
\qquad  \ell_i \triangleq \frac{y_i \phi_i - b(\phi_i)}{\rho_i} + \sum_{r} c(y_{ir}, \rho), 
\qquad  a(\rho)\triangleq \rho_i \triangleq \frac{\rho }{m_i}.
\end{align*}
Note again that in the non-grouping setting, we have $a(\rho)\equiv \rho$; while in the grouping setting, we have $a(\rho)\equiv \rho_i \triangleq \frac{\rho}{m_i}$ for $i\in\{1,2,\ldots,n\}$; see Remark~\ref{remark:dips_fac_inedf}.
That is, for each group $i\in\{1,2,\ldots,n\}$, the dispersion parameter is $a(\rho) = \frac{\rho}{m_i}$; for each single data $(\bx_i, y_{ir})$ with $i\in\{1,2,\ldots,n\}$ and $r\in\{1,2,\ldots,m_i\}$, the dispersion parameter is $\rho$.

Note that, by definition, \eqref{equation:edf_sumone_mean3}, and \eqref{equation:edf_sumone_var2}, we have 
\begin{equation}\label{equation:mean_var_glm_log}
\Exp[\ry_i \mid  \bbeta, \bx_i] = \mu_i = b'(\phi_i)
\qquad \text{and}\qquad 
\Var[\ry_i \mid \bbeta, \bx_i]  = \rho_i \mathcalV(\mu_i).
\end{equation}

\index{Score function}
\index{Fisher information}
\paragrapharrow{Score function and score equation.}
In statistics, particularly within the context of linear models and GLMs, the \textit{score function} refers to the gradient (i.e., derivative) of the log-likelihood function with respect to the model parameters. It is commonly used to find the maximum likelihood estimates (MLEs), which are the parameter values that maximize the likelihood of observing the given data.
The {score function} under \eqref{equation:glm_likeli} or \eqref{equation:glm_loglikeli} is then given by
\begin{equation}\label{equation:glm_score_func}
(\textbf{Score function}):\qquad 
\begin{aligned}
\sS(\bbeta) 
&= \frac{\partial \ell}{\partial \bbeta} = \sum_i \frac{\partial \ell_i}{\partial \bbeta} = \sum_i \frac{\partial \ell_i}{\partial \phi_i} \frac{\partial \phi_i}{\partial \mu_i} \frac{\partial \mu_i}{\partial \eta_i} \frac{\partial \eta_i}{\partial \bbeta};\\
&= \sum_i \left( \frac{y_i - \mu_i}{\rho_i} \right) \left( \frac{1}{\mathcalV(\mu_i)} \right) h'(\eta_i) \, \bx_i \\
&= \frac{1}{\rho} \sum_i m_i (y_i - \mu_i) \, \frac{1}{\mathcalV(\mu_i)} \, h'(\eta_i) \, \bx_i. 
\end{aligned}
\end{equation}
where, since  $\mu_i = b'(\phi_i)$, $\mathcalV(\mu_i) = b''(\phi_i)$, $\mu_i = h(\eta_i)$, and $\eta_i = \bbeta^\top \bx_i$, we use the facts:~\footnote{Note that here  we assume that $\rho$ does not depend on $\bbeta$.}
\begin{subequations}
\begin{align}
\frac{\partial \ell_i}{\partial \phi_i} &= \frac{y_i - b'(\phi_i)}{\rho_i} = \frac{y_i - \mu_i}{\rho_i}; 
\qquad &\frac{\partial \phi_i}{\partial \mu_i} &= 1 / \left( \frac{\partial \mu_i}{\partial \phi_i} \right) = \frac{1}{b''(\phi_i)} = \frac{1}{\mathcalV(\mu_i)} \label{equation:glm_score2}; \\
\frac{\partial \mu_i}{\partial \eta_i} &= h'(\eta_i) \label{equation:glm_score3}; 
\qquad&\frac{\partial \eta_i}{\partial \bbeta} &= \bx_i. 
\end{align}
\end{subequations}
The maximum likelihood estimate $\widehat{\bbeta}$ must then satisfy the \textit{score equation} (see Proposition~\ref{proposition:fermat_fist_opt}):
\begin{equation}
\textbf{(Score equation)}:\qquad \sS(\widehat{\bbeta}) = \bzero. 
\end{equation}

\begin{remark}[Score equation under EDFs]\label{remark:die_cal}
Note that the dispersion parameter $\rho$ cancels from the score equation, which implies that $\widehat{\bbeta}$ does not depend on $\rho$. 
This is observed in the Gauss-Markov case (Theorem~\ref{theorem:mle-gaussian}), where the MLE for the least squares solution does not depend on the variance parameter $\sigma^2$.
This is another important property of EDFs.

During the process of finding maximum likelihood estimates, we seek the values of $\bbeta$ that make the score function $S(\bbeta)$ equal to zero because these points could be where the log-likelihood function reaches its maximum. However, to ensure that we have found a maximum rather than a minimum, it's also necessary to check the second derivatives or use other methods such as the Fisher information matrix.

In the realm of GLMs, the score equations are typically obtained by setting the score function to zero and solving for the parameters $\bbeta$. Often, these equations do not have closed-form solutions, necessitating the use of numerical methods like the \textit{Newton-Raphson method} or \textit{iteratively reweighted least squares (IRLS)} to find the MLEs; see Section~\ref{section:itera_scor}.

Furthermore, the score function plays a crucial role in assessing the asymptotic properties of estimators. For example, the expected value of the score function is zero, and its variance is related to the Fisher information matrix. These properties help in deriving the asymptotic distribution of the estimator, which is essential for constructing confidence intervals and hypothesis testing.
\end{remark}

\begin{example}[Estimation of GLMs under natural link]\label{example:estimation_naturlink}
For the natural link (Section~\ref{section:glm_natural_cacno_link}), $\phi_i = \eta_i$, so Equations~\eqref{equation:glm_score2} and \eqref{equation:glm_score3} combine to give
$$
\frac{h'(\eta_i)}{\mathcalV(\mu_i)} = \frac{\partial \phi_i}{\partial \mu_i} \frac{\partial \mu_i}{\partial \eta_i} = \frac{\partial \phi_i}{\partial \eta_i} = 1.
$$
As a result, the score function  simplifies to
$
\sS(\bbeta) = \frac{1}{\rho} \sum_i m_i (y_i - \mu_i)  \bx_i. 
$
\end{example}

\paragrapharrow{Fisher information.}

To solve the score equation, we will also need the second derivative of the log-likelihood. Its negative is called the \textit{observed Fisher information}, defined as
\begin{equation}\label{equation:def_expfish_info}
\sI_{\mathrm{obs}}(\bbeta) \triangleq - \frac{\partial^2 \ell}{\partial \bbeta \partial \bbeta^\top} = - \frac{\partial \sS}{\partial \bbeta}.
\end{equation}
Note that, at the MLE, $\sI_{\text{obs}}(\widehat{\bbeta})$ is positive semidefinite by definition. Because it is a function of the data $\{y_i\}$, $\sI_{\text{obs}}$ has a probability distribution. In practice, the observed Fisher information is often approximated by the  \textit{Fisher information} (a.k.a., the \textit{expected Fisher information}; see Section~\ref{section:mvu_esti}):
\begin{equation}\label{equation:def_fish_info}
	\sI_n(\bbeta) = \Exp\left[-\frac{\partial \sS}{\partial \bbeta}\right],
\end{equation}
where the expectation is taken over the joint probability distribution of the data $p(\{y_{ir_i}\}\mid \bbeta, \{\bx_i\})$.

\begin{example}[Poisson GLM]\label{example:poiss_twocase}
Suppose $m_i=1$ with the Poisson GLM. We have $\rho=1$ by Example~\ref{example:edf_examps}. 
Let the Poisson parameter be $\lambda(\bx_i,\bbeta)$ for each data $(\bx_i,y_i)$ such that $\ry_i \mid \bx_i, \bbeta \sim \poissondist(\lambda(\bx_i, \bbeta))$
\paragraph{Natural link.}
For the natural link such that $h(\eta)=b'(\eta)=e^\eta$ by Example~\ref{example:edf_examps}, we have that: $\lambda(\bx_i, \bbeta) = \mu(\bx_i, \bbeta) = h(\eta(\bx_i, \bbeta)) = e^{\eta(\bx_i, \bbeta)} = e^{\bbeta^\top \bx_i}$.
By Example~\ref{example:estimation_naturlink}, we have
$\sS(\bbeta) = \sum_i (y_i - e^{\bbeta^\top \bx_i}) \, \bx_i$;
using \eqref{equation:def_expfish_info}, the observed Fisher information is:
$$
\sI_{\text{obs}}(\bbeta) = \sum_i e^{\bbeta^\top \bx_i} \, \bx_i \bx_i^\top.
$$
Since $\sI_{\text{obs}}(\bbeta)$ does not depend on $\ry_i$,  the expected Fisher information coincides with the observed Fisher information $\sI_{\text{obs}}(\bbeta)$.

\paragraph{Identity link.}
Now consider the identity link, where $h(\eta) = \eta$. In this case:
$$
\lambda(\bx, \bbeta) = \mu(\bx, \bbeta) = h(\eta(\bx, \bbeta)) =  \bbeta^\top \bx, 
\qquad
\mathcalV(\mu)=b''\big((b')^{-1}(\mu)\big) = \mu, 
\qquad
h'(\eta) = 1.
$$
By~\eqref{equation:glm_score_func} and \eqref{equation:def_expfish_info}, the score function and the observed Fisher information become: 
\begin{align*}
\sS(\bbeta) &= \sum_i (y_i - \mu_i) \frac{1}{\mu_i} \bx_i 
= \sum_i (y_i - \bbeta^\top \bx_i) \frac{1}{\bbeta^\top \bx_i} \bx_i 
= \sum_i \left( \frac{y_i}{\bbeta^\top \bx_i} - 1 \right) \bx_i;\\
\sI_{\text{obs}}(\bbeta) &= \sum_i \frac{y_i}{(\bbeta^\top \bx_i)^2} \bx_i \bx_i^\top.
\end{align*}
The expected Fisher information is:
$$
\begin{aligned}
\sI_n(\bbeta) &= \Exp[\sI_{\text{obs}}(\bbeta)]
= \Exp \left[ \sum_i \frac{\ry_i}{(\bbeta^\top \bx_i)^2} \bx_i \bx_i^\top \right] 
= \sum_i \frac{\Exp[\ry_i \mid  \bbeta, \bx_i]}{(\bbeta^\top \bx_i)^2} \bx_i \bx_i^\top 
= \sum_i \frac{1}{\bbeta^\top \bx_i} \bx_i \bx_i^\top.
\end{aligned}
$$
Note that $\sI_n(\bbeta) \neq \sI_{\text{obs}}(\bbeta)$ in this case.
\end{example}

\index{Fisher information}
\index{Score function}
\index{Score equation}
\subsection{Properties of Score Function and Fisher Information}

Having introduced the score function $\sS(\bbeta)$ and the Fisher information $\sI_n(\bbeta)$, we now explore some of their key statistical properties. 
Define $\sS_i(\bbeta) \triangleq \frac{\partial \ell_i}{\partial \bbeta}$, where $\ell_i$ denotes the log-likelihood for data $i$ with $i\in\{1,2, \ldots,n\}$. Then, the total score function can be written as: $\sS(\bbeta) = \sum_i \sS_i(\bbeta)$.

\paragrapharrow{Moments of $\sS(\bbeta)$.}
Since $\Exp[\ry_i \mid  \bbeta, \bx_i] = \mu_i$ by \eqref{equation:mean_var_glm_log}, the expectation of $\sS(\bbeta)$ can be computed from \eqref{equation:glm_score_func}  as:
\begin{equation}
\Exp[\sS(\bbeta)] = \sum_i \Exp[\sS_i(\bbeta)] 
= \sum_i \frac{\Exp[\ry_i \mid  \bbeta, \bx_i] - \mu_i}{\rho_i} \frac{1}{\mathcalV(\mu_i)} h'(\eta_i) \bx_i 
= \bzero.
\end{equation}
Using \eqref{equation:glm_score_func} again, we can calculate the variance of $\sS(\bbeta)$ as:
\begin{equation}\label{equation:varofsbeta}
\small
\begin{aligned}
\Cov[\sS(\bbeta)] 
&= \sum_i \Cov[\sS_i(\bbeta)] 
= \sum_i \Cov \left[ \frac{h'(\eta_i)}{\rho_i \mathcalV(\mu_i)} \bx_i (\ry_i - \mu_i) \right] \\
&\stackrel{\dag}{=} \sum_i \left( \frac{h'(\eta_i)}{\rho_i \mathcalV(\mu_i)} \bx_i \right) \Var[\ry_i - \mu_i] \left( \frac{h'(\eta_i)}{\rho_i \mathcalV(\mu_i)} \bx_i^\top \right)
= \sum_i \left( \frac{h'(\eta_i)^2}{\rho_i^2 \mathcalV(\mu_i)^2} \bx_i \bx_i^\top \right) \Var[\ry_i]\\
&= \sum_i \frac{h'(\eta_i)^2}{\rho_i \mathcalV(\mu_i)} \bx_i \bx_i^\top,
\end{aligned}
\end{equation}
where the equality ($\dag$) follows from the fact that $\Cov[\bA\rmX, \bB\rmY] = \bA\Cov[\rmX, \rmY] \bB^\top$ for any random matrices $\rmX, \rmY$ and fixed matrices $\bA, \bB$ with appropriate dimensions, and the last equality follows from the fact that $\Var[\ry_i \mid \bbeta, \bx_i] = \rho_i \mathcalV(\mu_i)$ by \eqref{equation:mean_var_glm_log}.

\paragrapharrow{Property of $\sI_n(\bbeta)$.}
Recall from the definition of the (expected) Fisher information $\sI_n(\bbeta) = \Exp\left[ -\frac{\partial \sS}{\partial \bbeta} \right]$ by \eqref{equation:def_fish_info}.
Taking the derivative of $\sS(\bbeta)$ in \eqref{equation:glm_score_func} and applying the product rule for derivatives, we have:
\begin{equation}
\begin{aligned}
\frac{\partial \sS}{\partial \bbeta} &= \frac{1}{\rho} \sum_i m_i \bx_i \left( \frac{\partial}{\partial \bbeta} (y_i - \mu_i) \cdot \frac{h'(\eta_i)}{\mathcalV(\mu_i)} + (y_i - \mu_i) \cdot \frac{\partial}{\partial \bbeta} \left( \frac{h'(\eta_i)}{\mathcalV(\mu_i)} \right) \right) \\
&= \sum_i \frac{\bx_i}{\rho_i} \left( -\frac{\partial \mu_i}{\partial \bbeta} \cdot\frac{h'(\eta_i)}{\mathcalV(\mu_i)} + (y_i - \mu_i)\cdot \frac{\partial}{\partial \bbeta} \left( \frac{h'(\eta_i)}{\mathcalV(\mu_i)} \right) \right).
\end{aligned}
\end{equation}
Therefore, the expected Fisher information is 
\begin{equation}\label{equation:propf_fbeta}
\begin{aligned}
\sI_n(\bbeta) 
&= \Exp\left[-\frac{\partial \sS}{\partial \bbeta}\right] 
= \sum_i \frac{\bx_i}{\rho_i} \left( \frac{\partial \mu_i}{\partial \bbeta} \frac{h'(\eta_i)}{\mathcalV(\mu_i)} 
- \Exp[\ry_i - \mu_i] \frac{\partial}{\partial \bbeta} \left( \frac{h'(\eta_i)}{\mathcalV(\mu_i)} \right) \right) \\
&
\stackrel{\dag}{=} \sum_i \frac{\bx_i}{\rho_i} \left( h'(\eta_i) \bx_i^\top \frac{h'(\eta_i)}{\mathcalV(\mu_i)} \right) 
= \sum_i \frac{h'(\eta_i)^2}{\rho_i \mathcalV(\mu_i)} \bx_i \bx_i^\top,
\end{aligned}
\end{equation}
where the equality ($\dag$) follows from the fact that $\Exp[\ry_i - \mu_i] = 0$ and 
$\frac{\partial \mu_i}{\partial \bbeta} = \frac{\partial \mu_i}{\partial \eta_i} \frac{\partial \eta_i}{\partial \bbeta} = h'(\eta_i) \bx_i^\top$.
Noting the expression of \eqref{equation:varofsbeta}, we conclude that:
\begin{equation}
\sI_n(\bbeta) = \Cov[\sS(\bbeta)],
\end{equation}
which matches the definition of the Fisher information given previously in Equation~\eqref{equation:fish_mul_defi}.

\begin{example}[Score function and Fisher information under natural link]\label{example:score_fisher_naturlink}
Following Example~\ref{example:estimation_naturlink}, for the natural link, we have that $\frac{h'(\eta_i)}{\mathcalV(\mu_i)} =\frac{\partial \phi_i}{\partial \eta_i} = 1$. 
Therefore,
\begin{equation}
\sS(\bbeta) = \frac{1}{\rho} \sum_i m_i (y_i - \mu_i)  \bx_i = \sum_i \frac{1}{\rho_i} \big(y_i - h(\eta_i)\big) \bx_i.
\end{equation}
Let $\sS_i \triangleq \frac{1}{\rho_i} \big(y_i - h(\eta_i)\big) \bx_i$ such that $\sS = \sum_i \sS_i$. We then have
\begin{align}
\sI_{\text{obs}}(\bbeta) &= -\frac{\partial \sS}{\partial \bbeta} = -\sum_i \frac{\partial \sS_i}{\partial \bbeta} = -\sum_i \frac{\partial \sS_i}{\partial \eta_i} \frac{\partial \eta_i}{\partial \bbeta} = \sum_i \frac{h'(\eta_i)}{\rho_i} \bx_i \bx_i^\top ;\\
\sI_n(\bbeta) &= \Cov[\sS(\bbeta)] = \sum_i \frac{h'(\eta_i)}{\rho_i} \bx_i \bx_i^\top.
\end{align}
Thus, for the natural link case, we observe that $\sI_n(\bbeta) = \sI_{\text{obs}}(\bbeta)$.
This extends the results in the Poisson GLM with natural link in Example~\ref{example:poiss_twocase}.
\end{example}

\subsection{Iterative Solution of Score Equation}\label{section:itera_scor}

So far, we have examined  how to formulate the score equation for obtaining maximum likelihood estimates. We have also established several  important properties of both the score equation and Fisher information matrix. 
Now, we turn our attention to the practical challenge of solving the score equation. 
In general, it is not possible to find an exact closed-form solution to the score equation, except in special cases---such as in the Gauss-Markov linear model discussed in Chapter~\ref{sec:lr-gaussian-noise}. Therefore, we must rely on numerical methods that can be implemented using computational tools.

Generally, we have two primary options: directly optimizing the log-likelihood function $\ell$, or solving the score equation. Numerous algorithms are available for these tasks. In this context, our focus will be on one particular method: 
\textit{iteratively reweighted least squares (IRLS)}, which is also known as \textit{iterative weighted least squares (IWLS)}.

\subsection*{Matrix Notations}

To derive the iterative method for calculating the maximum likelihood estimate of $\bbeta$ using the score equation, it is helpful to express the problem using matrix notation. Below are the key definitions:
\begin{itemize}
\item Let $\rvy \in \real^n$ be the random vector whose components are $\ry_i$, the observed response values. 
\item Let $\bX =[x_{ij}]\in \real^{n \times p}$ be the data matrix. To abuse the notation, we let each row of $\bX$ be $\bx_i$ for convenience; note that the $i$-th row of a matrix is denoted as $\bx^{(i)}$ otherwise in this book.
\item Let $\bmu \in \real^n$ be the vector whose components are $\mu_i = h(\bbeta^\top \bx_i)$, so that $\bmu = \Exp[\rvy]$.
\item Let $\bD =[d_{ij}] \in \real^{n \times n}$ be the diagonal matrix with entries $d_{ii} = h'(\eta_i)$.
\item Let $\bOmega=[\omega_{ij}] \in \real^{n \times n}$ be the covariance matrix of $\rvy$, with entries:
$$
\omega_{ij} = \Cov[\ry_i, \ry_j] = \Var[\ry_i] \delta_{ij} = \rho_i \mathcalV(\mu_i) \delta_{ij}.
$$
That is, 
$$
\bOmega = \diag(\Var[\ry_1] , \Var[\ry_2], \ldots, \Var[\ry_n] ) = \diag(\rho_1 \mathcalV(\mu_1), \rho_2 \mathcalV(\mu_2), \ldots, \rho_n \mathcalV(\mu_n)).
$$
\item Let $\bG =[g_{ij}] \in \real^{n \times n}$ be the diagonal matrix with components $g_{ii} =m_i $, known as the \textit{grouping matrix}.
\end{itemize}

\paragrapharrow{Score function and Fisher information.}
Recall that the score function \eqref{equation:glm_score_func} and Fisher information \eqref{equation:propf_fbeta} are defined as 
$$
\sS(\bbeta) = \sum_i \left( \frac{y_i - \mu_i}{\rho_i \mathcalV(\mu_i)} \right) h'(\eta_i) \bx_i
\qquad\text{and}\qquad 
\sI_n(\bbeta) = \sum_i \frac{h'(\eta_i)^2}{\rho_i \mathcalV(\mu_i)} \bx_i \bx_i^\top.
$$
In terms of the matrix notation, these become
\begin{align}\label{equation:matnot_sc_fis}
\sS(\bbeta) = \bX^\top \bD \bOmega^{-1} (\by - \bmu)
\qquad\text{and}\qquad 
\sI_n(\bbeta) = \bX^\top \bD^\top \bOmega^{-1} \bD \bX.
\end{align}

\paragrapharrow{Natural link.}
Note that for the natural link case (see Section~\ref{section:glm_natural_cacno_link}, Examples~\ref{example:estimation_naturlink}, \ref{example:poiss_twocase} and \ref{example:score_fisher_naturlink}),
it holds that $ \frac{\partial \phi_i}{\partial \eta_i} = \frac{h'(\eta_i)}{\mathcalV(\mu_i)} = 1$.
Thus, with the definition of $\rho_i = \rho / m_i$, we have:
$$
h'(\eta_i) = \mathcalV(\mu_i) = \frac{\Var[\ry_i]}{\rho_i} = m_i \frac{\Var[\ry_i]}{\rho}, \quad \forall \,  i\in\{1,2,\ldots,n\}
$$
Therefore, it follows that 
$$
\bD = \frac{1}{\rho} \bG \bOmega = \frac{1}{\rho} \bOmega \bG,
$$
whence we have 
\begin{equation}
\sS(\bbeta) = \frac{1}{\rho} \bX^\top \bG (\by - \bmu)
\qquad\text{and}\qquad 
\sI_n(\bbeta) = \frac{1}{\rho^2} \bX^\top \bG^\top \bOmega \bG \bX.
\end{equation}

\index{Fisher scoring method}
\index{Newton-Raphson method}
\subsection*{Iterative Solution of Score Equation}
\paragrapharrow{Newton-Raphson and Fisher scoring methods.}
We start by recalling the Newton-Raphson method,~\footnote{See, for example, \citet{lu2025practical} for more details.} which is commonly used to find the root (zero) of a function. 
In our case, we aim to solve the score equation:
\begin{equation}
\sS(\widehat{\bbeta}) = \bzero.
\end{equation}

Generally, for an iterative process for solving the score equation, we generate a series of parameters that converge to some point.
Denoting $t=1,2,\ldots$ as the iteration number,  iterative methods generate a sequence of vectors:
$$
\bbeta^{(1)}, \bbeta^{(2)}, \ldots, \bbeta^{(T)},
$$
where each new estimate is based on the previous one.
We then focus on the iteration $t$.
By the linear approximation theorem (Theorem~\ref{theorem:linear_approx}), we can approximate $\sS$ linearly at some point $\bbeta^\toptzero$:
$$
\sS(\bbeta^\toptzero + \bd^\toptzero) = \sS(\bbeta^\toptzero) + \frac{\partial \sS(\bbeta^\toptzero)}{\partial \bbeta} \bd^\toptzero + \mathcalO\big(\normtwobig{\bd^\toptzero}^2\big).
$$
Since we want $\sS(\bbeta^\toptzero + \bd^\toptzero) = \bzero$, ignoring the third term of the above equality, we can obtain $\bd^\toptzero$ approximately by:
\begin{equation}\label{equation:score_bd}
\begin{aligned}
\frac{\partial \sS(\bbeta^\toptzero)}{\partial \bbeta} \bd^\toptzero = -\sS(\bbeta^\toptzero)
&\quad \iff \quad  \\
\sI_{\text{obs}}(\bbeta^\toptzero) \bd^\toptzero = \sS(\bbeta^\toptzero)
&\quad \iff \quad
\bd^\toptzero = \big(\sI_{\text{obs}}(\bbeta^\toptzero)\big)^{-1} \sS(\bbeta^\toptzero),
\end{aligned}
\end{equation}
where we use the fact that $-\frac{\partial \sS(\bbeta^\toptzero)}{\partial \bbeta} = \sI_{\text{obs}}(\bbeta^\toptzero)$ by \eqref{equation:def_expfish_info}.
This gives us the updated estimate for the next iteration:
\begin{equation}
\bbeta^\toptone \leftarrow  \bbeta^\toptzero + \bd^\toptzero = \bbeta^\toptzero + \big(\sI_{\text{obs}}(\bbeta^\toptzero)\big)^{-1} \sS(\bbeta^\toptzero).
\end{equation}

\begin{algorithm}[h] 
\caption{Newton-Raphson or Fisher Scoring Methods for Solving Score Equation}
\label{alg:newton_raph_sco}
\begin{algorithmic}[1] 
\Require Score function $\sS(\bbeta)$; 
\State {\bfseries Input:}  Initialize $\bbeta^{(1)}$;
\For{$t=1,2,\ldots$}
\State Find a  direction $\bd^\toptzero$ by \eqref{equation:score_bd} or \eqref{equation:fishing_score_bd};
\State $\bbeta^{(t+1)} \leftarrow \bbeta^\toptzero +  \bd^\toptzero$;
\EndFor
\State {\bfseries Return:}  final  $\bbeta^{(t)}$;
\end{algorithmic} 
\end{algorithm}

Because computing $\sI_{\text{obs}}$ and  its inverse can be difficult in practice, we often replace it with the expected Fisher information \eqref{equation:def_fish_info}. 
This modified approach is known the \textit{Fisher scoring method}, where we compute $\bd^\toptzero$ by:
\begin{equation}\label{equation:fishing_score_bd}
\bd^\toptzero = \big(\sI_{\text{obs}}(\bbeta^\toptzero)\big)^{-1} \sS(\bbeta^\toptzero)
\qquad \implies\qquad 
\bd^\toptzero \approx \big(\sI_n(\bbeta^\toptzero)\big)^{-1} \sS(\bbeta^\toptzero).
\end{equation}
The complete  procedure for this method is summarized  in Algorithm~\ref{alg:newton_raph_sco}.

\index{Iterative weighted least squares (IWLS)}
\index{Iteratively reweighted least squares (IRLS)}
\paragrapharrow{Iteratively reweighted least squares (IRLS).}
We will now use \eqref{equation:fishing_score_bd} to derive the \textit{iteratively reweighted least squares (IRLS)} method in matrix form. 
From \eqref{equation:fishing_score_bd} and the iterative update rule $\bbeta^{(t+1)} \leftarrow \bbeta^\toptzero +  \bd^\toptzero$, we have that
\begin{equation}\label{equation:irls_upr1}
	\sI_n(\bbeta^\toptzero) \bd^\toptzero = \sS(\bbeta^\toptzero)
	\qquad \iff \qquad 
	\sI_n(\bbeta^\toptzero) \bbeta^\toptone = \sI_n(\bbeta^\toptzero) \bbeta^\toptzero + \sS(\bbeta^\toptzero).
\end{equation}
Using the previously defined matrix notations and defining:
\begin{equation}\label{equation:weight_def}
\bW \triangleq \bD^\top \bOmega^{-1} \bD
\end{equation}
($\bD, \bOmega$, and $\bW$ are all diagonal matrices), by \eqref{equation:matnot_sc_fis}, we can express the score function and Fisher information as follows:
\begin{align}
\sS(\bbeta) &= \bX^\top \bD \bOmega^{-1} (\by - \bmu)  = \bX^\top \bW \bD^{-1} (\by - \bmu);\\
\sI_n(\bbeta) &= \bX^\top \bD^\top \bOmega^{-1} \bD \bX = \bX^\top \bW \bX.
\end{align}
Let $\sS^\toptzero \triangleq\sS(\bbeta^\toptzero)$, $\sI^\toptzero \triangleq\sI_n(\bbeta^\toptzero)$, $\bD^\toptzero = \diag(h'(\bbeta^\toptzeroTOP\bx_1), h'(\bbeta^\toptzeroTOP\bx_2), \ldots, h'(\bbeta^\toptzeroTOP\bx_n))$, and $\bmu^\toptzero = [h(\bbeta^\toptzeroTOP\bx_1), h(\bbeta^\toptzeroTOP\bx_2), \ldots, h(\bbeta^\toptzeroTOP\bx_n)]^\top$. Then, \eqref{equation:irls_upr1} can be denoted as 
\begin{align}
(\bX^\top \bW^\toptzero \bX) \bbeta^\toptone
&=
\sI^\toptzero \bbeta^\toptzero + \sS^\toptzero 
= \bX^\top \bW^\toptzero \bX \bbeta^\toptzero + \bX^\top \bW^\toptzero \bD^{\toptzero-1} (\by - \bmu^\toptzero) \\
&\triangleq  \bX^\top \bW^\toptzero \widetilde{\by}^\toptzero\\
&\implies \qquad 
\bbeta^\toptone = \big(\bX^\top \bW^\toptzero \bX\big)^{-1} \bX^\top \bW^\toptzero \widetilde{\by}^\toptzero, \label{equation:irls_rule}
\end{align}
where $\widetilde{\by}^\toptzero \triangleq \bX \bbeta^\toptzero + \bD^{\toptzero-1} (\by - \bmu^\toptzero)$.

Thus, to find a solution for $\sS(\bbeta) = \bzero$, we can start from an arbitrary point $\bbeta^\topone$ and iteratively apply \eqref{equation:irls_rule} until a convergence criterion is met; e.g., the criterion defined by \eqref{equation:des_stopcri1}.
This sequence of iterated operations is called \textit{iteratively reweighted least squares (IRLS)} or \textit{iterative weighted least squares (IWLS)} since each iteration is the solution to the following weighted least squares problem: minimize the quantity $\ell_t(\bbeta)$ with respect to $\bbeta$, where
\begin{equation}
\ell_t(\bbeta) \triangleq (\widetilde{\by}^\toptzero - \bX\bbeta)^\top \bW^\toptzero (\widetilde{\by}^\toptzero - \bX\bbeta)
\end{equation}
and $\bW^\toptzero$ is known as the \textit{weight matrix}; see Section~\ref{section:generalizedLS}.
The full procedure of IRLS is formulated in Algorithm~\ref{alg:irls}.

\begin{algorithm}[h] 
\caption{IRLS for Solving Score Equation}
\label{alg:irls}
\begin{algorithmic}[1] 
\Require Score function $\sS(\bbeta)$; 
\State {\bfseries Input:}  Initialize $\bbeta^{(1)}$;
\For{$t=1,2,\ldots$}
\State Compute weight matrix $\bW^\toptzero$ and $\widetilde{\by}^\toptzero$;
\State $\bbeta^\toptone \leftarrow  \big(\bX^\top \bW^\toptzero \bX\big)^{-1} \bX^\top \bW^\toptzero \widetilde{\by}^\toptzero$;
\EndFor
\State {\bfseries Return:}  final  $\bbeta^{(t)}$;
\end{algorithmic} 
\end{algorithm}

\subsection{Estimation of Dispersion Parameter}
Since the dispersion parameter $\rho$ cancels out in the score equation $\sS(\widehat{\bbeta}) = \bzero$ (Remark~\ref{remark:die_cal}), it is not necessary to estimate $\rho$ in order to estimate the coefficients $\bbeta$. 
However, the variance of the estimator $\Var[\widehat{\bbeta}]$ does depend on $\rho$; for example, the variance of $\widehatbbeta$ in the Gauss-Markov linear model depends on the variance $\sigma^2$, as shown in Theorem~\ref{theorem:gauss-markov}. 

Therefore, if needed or of interest, we can estimate $\rho$ using the following formula:
\begin{equation}\label{equation:est_disp}
\widehat{\rho} = \frac{1}{n - p} \sum_i m_i \frac{(y_i - \widehat{\mu}_i)^2}{\mathcalV(\widehat{\mu}_i)},
\end{equation}
where $p$ is the number of parameters of the model, i.e., the number of columns in $\bX\in\real^{n\times p}$, and $\widehat{\mu}_i = \widehatbbeta^\top\bx_i$. 
This estimator is motivated by the fact that:
\begin{equation}
\Var[\ry_i] = \Exp[(\ry_i - \mu_i)^2] = \rho_i \mathcalV(\mu_i) = \frac{\rho}{m_i} \mathcalV(\mu_i)
\quad \implies \quad 
\rho= \Exp\left[m_i \frac{(\ry_i - \mu_i)^2}{\mathcalV(\mu_i)}\right].
\end{equation}
Hence, once we have obtained an estimate $\widehat{\bbeta}$, we can use its value and \eqref{equation:est_disp} to estimate $\widehat{\rho}$. 
The division by  $n-p$ ensures unbiased estimation, similar to the reasoning in Theorem~\ref{theorem:ss-chisquare}.

\begin{example}
For the Gaussian distribution, when $\ry\mid \bbeta, \bx \sim \normal(\mu, \sigma^2)$ with $m_i = 1$, we have $\mathcalV(\mu_i) = 1$ and thus,
\begin{equation}
	\widehat{\rho} = \frac{1}{n - p} \sum_i (y_i - \widehat{\mu}_i)^2 = S^2.
\end{equation}
This coincides with the well-known unbiased estimator of the error variance in the Gauss-Markov linear model; see Theorem~\ref{theorem:ss-chisquare}.
\end{example}

\subsection{Prediction}

Assume we have fitted a GLM, resulting in an estimated parameter vector $\widehat{\bbeta} \in \real^p$. 
The predicted value $\widehatby\equiv\widehat{\bmu}$ of the observed response $\by$ is then given by:
\begin{equation}
\widehatby = \Exp[\rvy \mid  \widehat{\bbeta}, \bX] = h(\widehat{\boldsymbol{\eta}}) = h(\bX\widehat{\bbeta} ) \equiv\widehat{\bmu}.
\end{equation}
The linear predictor $\widehat{\boldsymbol{\eta}}$ is a vector where each element represents the predicted linear combination of the predictors for each observation.

When new data become available, such as a new predictor vector $\bx_{\new}$, the GLM can be used to make predictions for these new observations as well. The first step is to compute the corresponding linear predictor:
$$
\widehat{\eta}_{\new } = \widehat{\bbeta}^\top \bx_{\new}.
$$
Once $\widehat{\eta}_{\new }$ is obtained, the predicted response for $\bx_{\new}$ is found  by using the response function $h(\cdot)$:
\begin{equation}
	\widehat{y}_{\new} = \Exp[\ry \mid  \widehat{\bbeta}, \bx_{\new}] = h(\widehat{\eta}_{\new}) = h(\widehat{\bbeta}^\top \bx_{\new}).
\end{equation}
Combining \eqref{equation:est_disp} and \eqref{equation:edf_sumone_var2} shows that the variance of this prediction is 
\begin{equation}
\Var[\ry_{\new} \mid \bx_{\new}, \widehatbbeta] = \widehat{\rho} \mathcalV(h(\widehat{\bbeta}^\top \bx_{\new})).
\end{equation}

\index{Asymptotics}
\section{Asymptotics for GLMs*}
In the previous section, we have seen how to fit a GLM to a dataset and estimate its parameters $\widehat{\bbeta}$. In this section, we will briefly explore the asymptotic properties of GLMs, learn how to make predictions, compute confidence intervals and confidence regions, the results of which can be used in the hypothesis testing procedures (see Section~\ref{section:mklm_hypotest}).

\subsection{Asymptotic Properties of $\widehat{\bbeta}$}
We have discussed large-sample properties of the OLS estimator in Sections~\ref{section:asymp_ols}. For now, we briefly discuss the asymptotic properties of the $\widehat{\bbeta}$ estimator in the context of GLMs \citep{fahrmeir1985consistency}.
In this group setup (Section~\ref{section:group_glm}), the term \textit{asymptotic} refers to the scenario where  $M = \sum_{i=1}^n m_i \to \infty$. This can occur if $n \to \infty$, or if each $m_i \to \infty$, or through some combination of both.

Let  $\bbeta_0$  denote the true value of $\bbeta$.
In what follows, we assume that $\widehat{\bbeta}$ is a consistent estimator of $\bbeta_0$, i.e., $\widehat{\bbeta}$ converges in probability to $\bbeta_0$, meaning that $\Pr\big(\normtwobig{\widehat{\bbeta} - \bbeta_0} \geq \epsilon\big) \to 0$ as $n \to \infty$ for any $\epsilon>0$ (Definition~\ref{definition:consist}). 
Recall that  this is denoted by $\widehat{\bbeta} \stackrel{p}{\longrightarrow} \bbeta_0$; see Definition~\ref{definition:convg_prob}. We will also abuse this notation to mean \textit{``tends to asymptotically'' for expectations}, i.e., if we write $\Exp[\rx] \overset{a}{=} x$, that means $\Exp[\rx] \xrightarrow{n \to \infty} x$.

Given the assumption of consistency, $\widehat{\bbeta}$ will be close to $\bbeta_0$ in large samples. Thus, we can expand the score function 
$\sS$ around it by Theorem~\ref{theorem:linear_approx}:
\begin{equation}\label{equation:asyump_hatbeta}
\begin{aligned}
\sS(\widehat{\bbeta}) = \bzero \overset{a}{=} \sS(\bbeta_0) + \frac{\partial \sS(\bbeta_0)}{\partial \bbeta} (\widehat{\bbeta} - \bbeta_0)
= \sS(\bbeta_0) - \sI_{\text{obs}}(\bbeta_0) (\widehat{\bbeta} - \bbeta_0)\\
\qquad \implies \qquad  \widehat{\bbeta} - \bbeta_0 \overset{a}{=} \sI_{\text{obs}}(\bbeta_0)^{-1} \sS(\bbeta_0).
\end{aligned}
\end{equation}

\paragrapharrow{Fisher scoring method.}
Previously, we stated that we often use the Fisher information in place of the observed Fisher information (known as the Fisher scoring method; see Algorithm~\ref{alg:newton_raph_sco}). 
In asymptotic analyses, this substitution is generally acceptable. We can heuristically justify this by considering the behavior of the observed Fisher information matrix. Specifically, for any $\bbeta$, we have:
\begin{equation}
\frac{1}{n} \sI_{\text{obs}}(\bbeta) = -\frac{1}{n} \frac{\partial \ell}{\partial \bbeta \partial \bbeta^\top}(\bbeta) = -\frac{1}{n} \sum_{i=1}^n \frac{\partial \ell_i}{\partial \bbeta \partial \bbeta^\top}(\bbeta) 
\to -\Exp\left[\frac{\partial \ell_i}{\partial \bbeta \partial \bbeta^\top}(\bbeta)\right] = \sI_1(\bbeta),
\end{equation}
where $ \sI_1(\bbeta) $ is the expected Fisher information for a single observation, and the convergence follows from the law of large numbers as $ n \to \infty $ (Definition~\ref{theorem:l2weaklaw_large}). It can be shown  that $ \sI_n(\bbeta) = n \sI_1(\bbeta) $, thus justifying use of $ \sI_{\text{obs}}(\bbeta) \overset{a}{=} \sI_n(\bbeta) $ in the asymptotic arguments that follow.

\paragrapharrow{Mean of $\widehat{\bbeta}$.}
From \eqref{equation:asyump_hatbeta}, we have:
$$
\widehat{\bbeta} - \bbeta_0 \overset{a}{=} \sI_{\text{obs}}(\bbeta_0)^{-1} \sS(\bbeta_0) \overset{a}{=} \sI_n(\bbeta_0)^{-1} \sS(\bbeta_0).
$$
Because convergence in probability implies convergence in distribution, this in turn implies that
$$
\Exp[\widehat{\bbeta} - \bbeta_0] \overset{a}{=} \sI_n(\bbeta_0)^{-1} \Exp[\sS(\bbeta_0)] = \bzero.
$$
In other words, the estimator $ \widehat{\bbeta} $ is asymptotically unbiased.

\paragrapharrow{Variance of $\widehat{\bbeta}$.}
Since $ \Exp[\widehat{\bbeta} - \bbeta_0] \overset{a}{=} \bzero $, we can compute its covariance as follows:
$$
\begin{aligned}
\Cov[\widehat{\bbeta} - \bbeta_0] &\overset{a}{=} \Exp[(\widehat{\bbeta} - \bbeta_0)(\widehat{\bbeta} - \bbeta_0)^\top]
\overset{a}{=} \Exp[\sI_n(\bbeta_0)^{-1} \sS(\bbeta_0) \sS(\bbeta_0)^\top \sI_n(\bbeta_0)^{-\top}] \\
&= \sI_n(\bbeta_0)^{-1} \Exp[\sS(\bbeta_0) \sS(\bbeta_0)^\top] \sI_n(\bbeta_0)^{-\top}
= \sI_n(\bbeta_0)^{-1} \Cov[\sS(\bbeta_0)] \sI_n(\bbeta_0)^{-\top} \\
&= \sI_n(\bbeta_0)^{-1},
\end{aligned}
$$
where we use the fact that $ \sI $ is symmetric and  that $ \sI_n(\bbeta_0) = \Cov[\sS(\bbeta_0)] $; see \eqref{equation:fish_mul_defi}.
Therefore, this concludes that 
\begin{equation}\label{equation:var_betahat}
\Cov[\widehat{\bbeta}] = \Cov[\widehat{\bbeta} - \bbeta_0] \overset{a}{=} \sI_n(\bbeta_0)^{-1}.
\end{equation}

\paragrapharrow{Asymptotic normality.}
The following is a sketch of the argument of asymptotic normality for $ \widehat{\bbeta} - \bbeta_0 $, i.e., $ \widehat{\bbeta} - \bbeta_0 $ converges asymptotically to a Gaussian distribution. We begin with the expression:
\begin{equation}
\sS(\bbeta) = \sum_i \sS_i(\bbeta)
\end{equation}
where $ \sS_i(\bbeta) $ is defined as $\sS_i(\bbeta) \triangleq \frac{\partial \ell_i}{\partial \bbeta}$. This is a sum of independent random variables with zero mean and finite variance. As the number of terms in the sum tends to infinity, then
under a certain condition, the distribution of the sum converges in distribution to a normal distribution. Since $\Exp[\sS(\bbeta)] = \bzero$ and $\Cov[\sS(\bbeta)] = \sI_n(\bbeta)$, we have:
\begin{equation}
\sS(\bbeta) \overset{a}{\sim} \normal\big(\bzero, \sI_n(\bbeta)\big).
\end{equation}
Hence,
\begin{equation}
\widehat{\bbeta} - \bbeta_0 \overset{a}{=} \sI_n(\bbeta_0)^{-1} \sS(\bbeta_0) \overset{a}{\sim} \normal\big(\bzero, \sI_n(\bbeta_0)^{-1} \sI_n(\bbeta_0) \sI_n(\bbeta_0)^{-\top}\big).
\end{equation}
Using the symmetric of $\sI_n$ is symmetric and the fact that convergence in probability implies convergence in distribution, we obtain:
\begin{equation}\label{equation:asp_betahatstar_normal}
\widehat{\bbeta}   \overset{a}{\sim} \normal\big(\bbeta_0, \sI_n(\bbeta_0)^{-1}\big).
\end{equation}
This result further implies that:
\begin{equation}\label{equation:glm_gau_chi}
(\widehat{\bbeta} - \bbeta_0)^\top \sI_n(\bbeta_0) (\widehat{\bbeta} - \bbeta_0) \overset{a}{\sim} \chi^2_{(p)},
\end{equation}
where $p$ denotes the number of parameters in the model (Remark~\ref{remark:asump_gau_chi}).

More formally, we state the following theorem on the asymptotic normality of the MLE in GLMs without proof. A detailed proof can be found, for example, in \citet{sen2010finite}.
\begin{theoremHigh}[Asymptotic normality of MLE in GLM \citep{sen2010finite}]\label{theorem:asymp_norm_glm}
Assume that 
\begin{enumerate}[(i)]
\item $\bbeta \in \sB$ for $\sB$ an open convex subset of $\real^p$.
\item The $p \times p$ matrix $\bX^\top \bX$ is of full rank for all $n$.
\item The information diverges, i.e. $\lambda_{\min}\left(\sI_n(\bbeta)\right) \to \infty$ as $n \to \infty$ for $\lambda_{\min}(\cdot)$ the smallest eigenvalue.
\item Given any parameter $\bbeta \in \real^p$ it holds that
$$
\sup_{\balpha \in \sN_\delta(\bbeta)} \normtwo{\sI_n^{-1/2}(\bbeta) \sI_n^{1/2}(\balpha) - \bI_p} \to 0,
$$
$\forall\ \delta > 0$, where $\sN_\delta(\bbeta) = \{\balpha \in \real^p \mid  (\balpha - \bbeta)^\top \sI_n(\bbeta)(\balpha - \bbeta) \leq \delta\}$.
\end{enumerate}
Then, as $n \to \infty$, provided it exists, the MLE $\widehatbbeta$ of $\bbeta_0$ is unique and satisfies
$$
\widehatbbeta  \stackrel{d}{\longrightarrow} \normal\big(\bbeta_0, \sI_n(\bbeta_0)^{-1}\big).
$$
\end{theoremHigh}
The  condition (iv) in the theorem requires  that the (root) information matrix converge uniformly on compact ellipsoids centered  at the true parameter value. 
Just as Gauss-Markov linear models, on the other hand, this theorem can be used to perform hypothesis tests  (see Section~\ref{section:mklm_hypotest}) or confidence interval analysis (see below).

\begin{exercise}\label{exercise:fbetastar_hat}
Show that $\sI_n(\bbeta_0) \overset{a}{=} \sI_n(\widehat{\bbeta})$ and thus we can replace $\sI_n(\bbeta_0)$ by $\sI_n(\widehat{\bbeta})$ in above results. \textit{Hint: Use continuous mapping theorem (Theorem~\ref{theorem:cont_mapthep}) and \eqref{equation:asp_betahatstar_normal}.}
\end{exercise}

\index{Confidence interval}
\subsection{Prediction and Confidence Intervals}

Assume a GLM has been fitted, yielding $\widehat{\bbeta} \in \real^p$. If we are given a new predictor vector $\bx_{\text{new}}$, we can compute
\begin{equation}
\widehat{\eta}_{\text{new}} = \widehat{\bbeta}^\top \bx_{\text{new}}
\end{equation}
and use this to predict the expected response as:
\begin{equation}
\widehat{y}_{\text{new}} = \Exp[\ry \mid  \widehat{\bbeta}, \bx_{\text{new}}] = h(\widehat{\eta}_{\text{new}}) = h(\widehat{\bbeta}^\top \bx_{\text{new}}).
\end{equation}
Next, we aim to construct confidence intervals for $\Exp[\ry \mid  \bbeta, \bx_{\text{new}}]$. To do so, recall from Equation~\eqref{equation:var_betahat} and Exercise~\ref{exercise:fbetastar_hat} that:
\begin{equation}
\Cov[\widehat{\bbeta}] \overset{a}{=} \sI_n(\widehat{\bbeta})^{-1}.
\end{equation}
It follows that:
\begin{equation}
\Var[\widehat{\eta}_{\text{new}}] 
= \bx_{\text{new}}^\top \Cov[\widehat{\bbeta}] \bx_{\text{new}} \overset{a}{=} \bx_{\text{new}}^\top \sI_n(\widehat{\bbeta})^{-1} \bx_{\text{new}}.
\end{equation}
Thus, an approximate $(1 - \alpha)$ confidence interval for $\Exp[\ry \mid  \bbeta, \bx_{\text{new}}]$ is given by
$$
\CI = \left[ h \left( \widehat{\eta}_{\text{new}} - z_{\frac{\alpha}{2}} \sqrt{\bx_{\text{new}}^\top \sI_n(\widehat{\bbeta})^{-1} \bx_{\text{new}}} \right), \; h \left( \widehat{\eta}_{\text{new}} + z_{\frac{\alpha}{2}} \sqrt{\bx_{\text{new}}^\top \sI_n(\widehat{\bbeta})^{-1} \bx_{\text{new}}} \right) \right],
$$
where $z_{\frac{\alpha}{2}}$ represents the critical value from the standard normal distribution. 
Note that in general, this is not symmetric about $ h(\widehat{\eta}_{\text{new}}) $ due to the nonlinearity of the link function $h(\cdot)$.

\section{Model Evaluation  for GLMs}\label{section:model_eva_glms}

We introduced model evaluation and model selection methods for (Gauss-Markov) linear models in Chapter~\ref{chapter:model_eva_sel}. For GLMs, additional measures, methods, or variations can be applied.

\index{Model evaluation}
\index{Deviance}
\subsection{Deviance in GLMs}

Suppose we have fitted a GLM. We would like to find a measure for \textit{goodness of fit (GOF)}, or, to put it another way, a measure for the \textit{discrepancy} between the data $\by\in \real^n$ and the fit $\widehatby \equiv \widehat{\bmu} = [\widehat{\mu}_1, \widehat{\mu}_2, \ldots, \widehat{\mu}_n^\top] \in \real^n$, where $\widehat{\mu}_i = h(\bbeta^\top \bx_i)$ for $i\in\{1,2,\ldots,n\}$. 
Note that in GLMs, the prediction $\widehatby$ is equal to the estimated mean $\widehat{\bmu}$.
To define such a goodness-of-fit measure, we must first understand how well any GLM can potentially fit the data.

In GLMs, we define the difference in likelihoods between the saturated model and the fitted model (defined in Section~\ref{section:goodness_fit}) as the \textit{deviance statistic}, commonly denoted as $\Deviance$. 
More precisely, the deviance is a special case of the likelihood ratio statistic that compares the goodness of fit of the saturated model and the fitted model.

The saturated model perfectly fits the data, so its log-likelihood is theoretically the maximum possible, representing the best possible fit. Therefore, it serves as a ``reference" against which the fit of our trained model can be compared. The closer the log-likelihood of the trained model is to that of the saturated model, the better the model fits the data.

Recall the general form of the EDF probability function in GLM:
\begin{equation}
p(y_i \mid \phi_i, \rho) = \exp \left\{ \frac{\phi_i y_i - b(\phi_i)}{a(\rho)} + c(y_i, \rho) \right\}
\end{equation}
The natural parameter $\phi_i$ can be expressed as a function of the expectation $\mu_i$. Therefore, for the fitted model, the natural parameter $\phi_i$ can be written as $\phi_i(\widehat{\mu}_i)$, and the log-likelihood function of the fitted model can be written as:
\begin{equation}
\ln \mathcalL_t  = \sum_{i=1}^{n} \frac{y_i \phi(\widehat{\mu}_i) - b(\phi(\widehat{\mu}_i))}{a(\rho)} + \sum_{i=1}^{n} c(y_i, \rho)
\end{equation}

Thus, we have expressed the log-likelihood function of the fitted model as a function of $\widehat{\bmu}$. Similarly, for the saturated model, since the model perfectly fits the data, its prediction is exactly equal to the observed value of the sample, i.e., $\widehat{y}_i = y_i$. In other words, for the saturated model, $\widehat{y}_i = \widehat{\mu}_i = y_i$. Therefore, the log-likelihood function of the saturated model is:
\begin{equation}
	\ln \mathcalL_f = \sum_{i=1}^{n} \frac{y_i \phi(y_i) - b(\phi(y_i))}{a(\rho)} + \sum_{i=1}^{n} c(y_i, \rho)
\end{equation}

Note that in GLMs, the dispersion parameter $\rho$ is independent of the model's expectation $\mu$.
Because the dispersion parameter $\rho$ cancels out in the score equation $\sS(\widehat{\bbeta}) = \bzero$ (Remark~\ref{remark:die_cal}), it does not vary across observations and remains constant whether we are dealing with the saturated or the fitted model.

Now substitute these  two quantities into the definition of the deviance statistic given in \eqref{equation:def_deviance}. The terms $\sum_{i=1}^{n} c(y_i, \rho)$ cancel out from both log-likelihoods:
\begin{equation}
	\Deviance = \frac{2}{a(\rho)} \sum_{i=1}^{n} [y_i \{\phi(y_i) - \phi(\widehat{\mu}_i)\} - b\{\phi(y_i)\} + b\{\phi(\widehat{\mu}_i)\}]
\end{equation}
Note that the deviance statistic is defined based on the entire set of observed data. The contribution of a single observation to the overall deviance is often referred to as the \textit{unit deviance}, denoted by $d_i(y_i, \widehat{\mu}_i)$. The total deviance for the full dataset is simply the sum of all individual unit deviances: $\Deviance = \sum_{i=1}^{n} d_i(y_i, \widehat{\mu}_i)$.

The deviance statistic is a special case of the log-likelihood ratio statistic, comparing the fit of the fitted model (the model we trained) and the saturated model. The saturated model achieves a perfect fit to the data, so its log-likelihood represents the theoretical maximum for the given observations. 

Therefore, the deviance can serve as a measure of how well our model fits the data. However, since it is derived from the likelihood ratio framework, the deviance inherits certain statistical properties---most notably, its asymptotic distribution follows a Chi-squared distribution. Like other likelihood ratio statistics, the deviance alone cannot directly determine whether a model is ``good" or ``bad"; instead, hypothesis testing methods are required to make such judgments. For more information, see Section~\ref{section:mklm_hypotest}.

\begin{example}[Deviance and squared error in Gaussian GLM (Gauss-Markov model)]
In the case of the natural link function for the Gauss-Markov linear regression model, we have: $\phi = \eta = \mu$, $b(\phi) = \mu^2 / 2$, $a(\rho) = \sigma^2$. Therefore, the deviance is:
\begin{equation}
\begin{aligned}
\Deviance\cdot a(\rho) &= 2 \sum_{i=1}^{n} [y_i \{y_i - \widehat{\mu}_i\} - y_i^2 / 2 + \widehat{\mu}_i^2 / 2]
= 2 \sum_{i=1}^{n} [y_i^2 / 2 - y_i \widehat{\mu}_i + \widehat{\mu}_i^2 / 2]
= \sum_{i=1}^{n} (y_i - \widehat{\mu}_i)^2.
\end{aligned}
\end{equation}
Once again, this result shows that, for the Gauss-Markov linear model, the deviance is equivalent to the sum of squared errors, as also stated in Equation~\eqref{equation:dev_squa_error}. In fact, the concept of deviance can be seen as a generalization of the least squares criterion (or squared loss) used in classical linear regression, extended to the broader class of GLMs; see Table~\ref{tab:glm-deviance} for more examples.
\end{example}

\begin{table}[h]
\centering
\caption{Deviance for common types of GLMs (with non-grouped data)}
\label{tab:glm-deviance}
\begin{tabular}{|l|l|}
\hline
\textbf{Distribution} & \textbf{Deviance} \\ \hline
Gaussian              & $\sum_{i=1}^{n}(y_i - \widehat{\mu}_i)^2$ \\ \hline
Poisson               & $2 \sum_{i=1}^{n}\{y_i \ln(y_i/\widehat{\mu}_i) - (y_i - \widehat{\mu}_i)\}$ \\ \hline
Bernoulli             & $-2\big(\sum_{i:y_i=0} \ln (1-\widehat{\mu}_i) + \sum_{i:y_i=1}\ln(\widehat{\mu}_i)  \big)$  \\ \hline
Gamma                 & $2 \sum_{i=1}^{n}\{-\ln(y_i/\widehat{\mu}_i) + (y_i - \widehat{\mu}_i)/\widehat{\mu}_i\}$ \\ \hline
Inverse-Gaussian      & $\sum_{i=1}^{n}\{(y_i - \widehat{\mu}_i)^2/(\widehat{\mu}_i^2 y_i)\}$ \\ \hline
\end{tabular}
\end{table}

\subsection{Deviance-Based $R^2$}\label{section:dev_r2}

Both $R^2$ and $\overline{R}^2$ (i.e., the adjusted $R^2$) are originally defined in the context of Gauss-Markov linear models and are not directly applicable to GLMs; see Section~\ref{section:coeff_det_r2}. 
To address this limitation, many researchers have proposed alternative versions of $R^2$ suitable for evaluating the fit of GLM models. In this section, we introduce one such version based on the concept of deviance.

We know that the deviance statistic generalizes the idea of the residual sum of squares (RSS) in classical linear regression. Therefore, we can define an analogous version of $R^2$ using deviance.
Let $\mathcalL_0$ denote the likelihood of the null model (the model with only a constant $1$ covariate, the intercept parameter), and define the deviance of the null model as the null deviance, denoted by $\Deviance_0$:
\begin{equation}
	\Deviance_0 = 2\rho (\ln \mathcalL_f - \ln \mathcalL_0).
\end{equation}
Similarly, let $\Deviance$ denote the  deviance statistic of the fitted model. 
The fitted model improves upon the null model by including predictor variables (corresponding to the columns of the design matrix $\bX\in\real^{n\times p}$), resulting in a smaller deviance.
Recall the $R^2$ measure (Definition~\ref{definition:r2_mea}) is defined as 
$$
R^2 = 1 - \frac{\sum_{i=1}^{n}(y_i - \widehat{y}_i)^2}{\sum_{i=1}^{n}(y_i - \overline{y})^2}
=
1 - \frac{\normtwo{\by-\widehatby}^2}{ \normtwo{\by - \overline{y}\bone_n}^2}
= 1 - \frac{\RSS}{\TSS},
$$
To extend this definition to GLMs, we can replace $\RSS$ with $\Deviance$ and $\TSS$ with $\Deviance_0$ to obtain a deviance-based version of $R^2$:
\begin{equation}
R_D^2 = 1 - \frac{\Deviance}{\Deviance_0} \in[0,1]. 
\end{equation}
This higher the better the fitting.
This measure provides a way to assess how much better the fitted model performs compared to the null model, similar to how $R^2$ compares the fitted model to the baseline mean model in linear regression.

\index{Pearson Chi-squared statistic}
\subsection{Pearson Chi-Squared Statistic}

Another commonly used goodness of fit statistic in GLMs is the \textit{generalized Pearson Chi-squared statistic} or simply \textit{Pearson Chi-squared statistic}. 
It is defined as:
\begin{equation}\label{equation:pstat1}
\chisquared_{P} \triangleq \sum_{i=1}^{n} \frac{(\ry_i - \widehat{\mu}_i)^2}{a(\rho) \mathcalV(\widehat{\mu}_i)},
\end{equation}
where $ a(\rho) \mathcalV(\widehat{\mu}_i) $ represents the variance of the model; see \eqref{equation:edf_sumone_var2}. 
Similar to the deviance statistic, some sources omit the dispersion function $ a(\rho) $ and directly define it as follows:
\begin{equation}\label{equation:pstat2}
\chisquared_{P_s} \triangleq \sum_{i=1}^{n} \frac{(\ry_i - \widehat{\mu}_i)^2}{\mathcalV(\widehat{\mu}_i)}.
\end{equation}

However, this simplified form is not always accurate and can lead to confusion. 
The term $a(\rho)$ should only be omitted when $a(\rho)=1$. Some references refer to \eqref{equation:pstat2} as the \textit{Pearson Chi-squared statistic} and call \eqref{equation:pstat1} the \textit{scaled Pearson Chi-squared statistic}. For consistency and clarity, this book uses the full expression given in \eqref{equation:pstat1} as the default definition of the Pearson Chi-squared statistic unless otherwise specified.

By definition, the asymptotic distribution of the Pearson Chi-squared statistic follows a Chi-squared distribution, with degrees of freedom equal to the sample size minus the number of model parameters, $ n - p$:
\begin{equation}
\chisquared_{P} \sim \chisquared_{(n-p)}.	
\end{equation}
The deviance statistic is based on maximum likelihood estimation, which gives it certain advantages when comparing nested models estimated using maximum likelihood methods. In the case of Gaussian models, where $ \mathcalV(\mu) = 1 $ and $ a(\rho) = 1 $, the Pearson Chi-squared statistic, the deviance statistic, and the squared loss are all equivalent and exactly follow the Chi-squared distribution.

The residual sum of squares (RSS), defined as $\sum_{i} (\ry_i-\widehat{\mu}_i)^2$, is an intuitive measure that quantifies the total squared difference between observed values and model predictions. However, its magnitude can vary significantly across different datasets or modeling scenarios, making direct comparisons difficult. The Pearson Chi-squared statistic addresses this issue by normalizing the RSS with respect to the model variance. This normalization effectively expresses the residuals in terms of standard deviations, allowing for a more meaningful and interpretable comparison of model fit.

\index{Model diagnostics}
\index{Residual analysis}
\subsection{Residual and GLM Diagnostics}

In evaluating a model, residuals measure the difference between each observed value and its corresponding fitted value. The extent to which an observation affects the estimated coefficients is known as influence. \citet{cox1968general} and \citet{pierce1986residuals} have extensively discussed various definitions of residuals in GLMs. In the following, we introduce two types of residuals used in GLM analysis.

Previously, the residual is also denoted by $e_i = y_i-\widehaty_i\equiv y_i-\widehat{\mu}_i$ for each observation $i\in\{1,2,\ldots,n\}$. 
These are called the \textit{response residual} in the literature, which are  simply the difference between the observed value  $y_i$ and the model's fitted value (predicted value) $\widehat{y}_i$.

\paragrapharrow{Pearson residuals.}
The sum of \textit{squared Pearson residuals} equals the Pearson Chi-squared statistic:
\begin{equation}
e_i^P \triangleq \frac{y_i - \widehat{\mu}_i}{\sqrt{a(\rho)\mathcalV(\widehat{\mu}_i)}}.
\end{equation}
The denominator represents the square root of the variance function, scaling the residuals to a comparable scale. A large absolute value of a residual suggests that the model does not adequately fit the specific observation. One common method for detecting outliers is to plot standardized Pearson residuals against the observed values; see Section~\ref{section:gau_diag}.

\paragrapharrow{Deviance residuals.}

Deviance plays a crucial role in the derivation and inference of GLMs. Deviance residuals represent the contribution of each observation to the overall deviance. These residuals can be standardized or studentized, or both. The formula for deviance residuals is:
\begin{equation}
e_i^D \triangleq \text{sign}(y_i - \widehat{\mu}_i) \sqrt{{d}_i^2},
\end{equation}
where $\Deviance = \sum_{i=1}^{n} d_i$ and $d_i = \frac{2 w_i}{\rho}  [y_i \{\phi(y_i) - \phi(\widehat{\mu}_i)\} - b\{\phi(y_i)\} + b\{\phi(\widehat{\mu}_i)\}]$ denotes the contribution of point (or data group) $i$ to the overall deviance.

In model diagnostics, deviance residuals---whether standardized or not---are often preferred over Pearson residuals because their distributional properties more closely resemble those found in linear regression models. This makes them particularly useful for identifying potential issues with model fit.

Just as in linear models (Section~\ref{section:gau_diag}), we can use deviance residuals ($e_i^D$) or Pearson residuals ($e_i^P$) in diagnostic plots---such as plotting them against observation indices or predictor variables---to detect potential violations of model assumptions. However, unlike in linear regression, these residuals are not normally distributed. This lack of normality makes it more difficult to interpret such plots and determine what patterns might indicate model problems.

To address this limitation, various transformed residuals have been proposed in the literature, including adjusted deviance residuals and Anscombe residuals \citep{pierce1986residuals, amin2017influence}. These transformations aim to make the residuals more closely resemble Gaussian errors, thereby improving interpretability. We will not study these, but content ourselves with checking plots for suspicious looking patterns.

\begin{problemset}

\item \label{prob:edf_oths} Describe the following distribution using the exponential dispersion family form in \eqref{equation:exp_fam_edf}:
Gamma, Inverse-Gamma, Chi-squared, and Beta distributions.

\item Derive the generalized linear models in Tables~\ref{tab:common_glmmodels1} and \ref{tab:common_glmmodels2}.

\item \label{prob:inv_gau_glm1}
Consider the inverse-Gaussian distribution, which has the probability density function
\begin{equation}
f(x; \mu, \rho) = (2\pi x^3)^{-1/2} \exp \left\{ -\frac{1}{2\rho} \frac{(x-\mu)^2}{x\mu^2} \right\},
\end{equation}
where $ x > 0 $, $ \mu > 0 $, and $ \rho > 0 $.
Show that the inverse-Gaussian distribution belongs to the exponential dispersion family.

\item Consider an inverse-Gaussian GLM using a logarithm link function (see Problem~\ref{prob:inv_gau_glm1} and Table~\ref{tab:link_functions}):
\begin{itemize}
\item Determine the score function and the expected Fisher information.
\item Derive  the MLE for the parameter $\rho$.
\end{itemize}

\item Consider a binomial GLM using the natural link function. Determine the score function and the expected Fisher information.

\item Consider a Gamma GLM using the natural link function and the logarithmic link function (Table~\ref{tab:link_functions}). Determine the corresponding score functions and the expected Fisher information matrices.

\item Determine which of the following functions are valid link functions for a generalized linear model. For those that are not suitable, explain why:
\begin{enumerate}[(i)]
\item $ g(\mu) = \ln(\mu) $ when $ \mu > 0 $.
\item $ g(\mu) = -\frac{1}{\mu^2} $ when $ \mu > 0 $.
\item $ g(\mu) = \mu^2 $ when $ -\infty < \mu < \infty $.
\item $ g(\mu) = \abs{\mu} $ when $ -\infty < \mu < \infty $.
\item $ g(\mu) = \mu^2 $ when $ 0 < \mu < \infty $.
\end{enumerate}

\item Derive the deviance statistics for the results in Table~\ref{tab:glm-deviance}.
\end{problemset}

\newpage
\vskip 0.2in
\setcitestyle{numbers}
\bibliography{bib}

\begin{thebibliography}{144}
\providecommand{\natexlab}[1]{#1}
\providecommand{\url}[1]{\texttt{#1}}
\expandafter\ifx\csname urlstyle\endcsname\relax
  \providecommand{\doi}[1]{doi: #1}\else
  \providecommand{\doi}{doi: \begingroup \urlstyle{rm}\Url}\fi

\bibitem[Achlioptas(2003)]{achlioptas2003database}
Dimitris Achlioptas.
\newblock Database-friendly random projections: {J}ohnson-{L}indenstrauss with
  binary coins.
\newblock \emph{Journal of computer and System Sciences}, 66\penalty0
  (4):\penalty0 671--687, 2003.

\bibitem[Ahrendt(2005)]{ahrendt2005multivariate}
Peter Ahrendt.
\newblock The multivariate {G}aussian probability distribution.
\newblock \emph{Technical University of Denmark, Tech. Rep}, 203, 2005.

\bibitem[Ailon and Chazelle(2006)]{ailon2006approximate}
Nir Ailon and Bernard Chazelle.
\newblock Approximate nearest neighbors and the fast {J}ohnson-{L}indenstrauss
  transform.
\newblock In \emph{Proceedings of the thirty-eighth annual ACM symposium on
  Theory of computing}, pages 557--563, 2006.

\bibitem[Amin et~al.(2017)Amin, Amanullah, and Cordeiro]{amin2017influence}
Muhammad Amin, Muhammad Amanullah, and Gauss~M Cordeiro.
\newblock Influence diagnostics in the {G}amma regression model with adjusted
  deviance residuals.
\newblock \emph{Communications in Statistics-Simulation and Computation},
  46\penalty0 (9):\penalty0 6959--6973, 2017.

\bibitem[Anderson(2003)]{anderson1962introduction}
Theodore~Wilbur Anderson.
\newblock An introduction to multivariate statistical analysis.
\newblock Technical report, Wiley New York, 2003.

\bibitem[Beal(2003)]{beal2003variational}
Matthew~James Beal.
\newblock \emph{Variational algorithms for approximate {B}ayesian inference}.
\newblock University of London, University College London (United Kingdom),
  2003.

\bibitem[Beck(2014)]{beck2014introduction}
Amir Beck.
\newblock \emph{Introduction to nonlinear optimization: Theory, algorithms, and
  applications with {MATLAB}}.
\newblock SIAM, 2014.

\bibitem[Beck(2017)]{beck2017first}
Amir Beck.
\newblock \emph{First-order methods in optimization}.
\newblock SIAM, 2017.

\bibitem[Bennett et~al.(2007)Bennett, Lanning, et~al.]{bennett2007netflix}
James Bennett, Stan Lanning, et~al.
\newblock The {N}etflix prize.
\newblock In \emph{Proceedings of KDD cup and workshop}, volume 2007, page~35.
  New York, NY, USA., 2007.

\bibitem[Bickel and Doksum(2015)]{bickel2015mathematical}
Peter~J Bickel and Kjell~A Doksum.
\newblock \emph{Mathematical statistics: basic ideas and selected topics,
  volumes I-II package}.
\newblock CRC Press, 2015.

\bibitem[Bishop(2006)]{bishop2006pattern}
Christopher~M Bishop.
\newblock Pattern recognition.
\newblock \emph{Machine learning}, 128\penalty0 (9), 2006.

\bibitem[Bj{\"o}rck(2024)]{bjorck2024numerical}
{\AA}ke Bj{\"o}rck.
\newblock \emph{Numerical methods for least squares problems}.
\newblock SIAM, 2024.

\bibitem[Bojanczyk et~al.(1987)Bojanczyk, Brent, Van~Dooren, and
  De~Hoog]{bojanczyk1987note}
Adam~W Bojanczyk, RP~Brent, Paul Van~Dooren, and FR~De~Hoog.
\newblock A note on downdating the {C}holesky factorization.
\newblock \emph{SIAM Journal on Scientific and Statistical Computing},
  8\penalty0 (3):\penalty0 210--221, 1987.

\bibitem[Box and Draper(1987)]{box1987empirical}
George~EP Box and Norman~R Draper.
\newblock \emph{Empirical model-building and response surfaces.}
\newblock John Wiley \& Sons, 1987.

\bibitem[Boyd(2004)]{boyd2004convex}
Stephen Boyd.
\newblock Convex optimization.
\newblock \emph{Cambridge UP}, 2004.

\bibitem[Cameron and Trivedi(2005)]{cameron2005microeconometrics}
A~Colin Cameron and Pravin~K Trivedi.
\newblock \emph{Microeconometrics: methods and applications}.
\newblock Cambridge university press, 2005.

\bibitem[Chandrasekaran et~al.(1998)Chandrasekaran, Gu, and
  Sayed]{chandrasekaran1998stable}
Shivkumar Chandrasekaran, Ming Gu, and Ali~H Sayed.
\newblock A stable and efficient algorithm for the indefinite linear
  least-squares problem.
\newblock \emph{SIAM Journal on Matrix Analysis and Applications}, 20\penalty0
  (2):\penalty0 354--362, 1998.

\bibitem[Chang(1997)]{chang1997pertubation}
Xiao-Wen Chang.
\newblock \emph{Pertubation Analysis of Some Matrix Factorizations}.
\newblock McGill University Montreal, 1997.

\bibitem[Chatterjee and Hadi(1988)]{chatterjee1988sensitivity}
Samprit Chatterjee and Ali~S Hadi.
\newblock \emph{Sensitivity analysis in linear regression}.
\newblock John Wiley \& Sons, 1988.

\bibitem[Chen et~al.(2008)Chen, Davis, Hager, and
  Rajamanickam]{chen2008algorithm}
Yanqing Chen, Timothy~A Davis, William~W Hager, and Sivasankaran Rajamanickam.
\newblock Algorithm 887: {CHOLMOD}, supernodal sparse {C}holesky factorization
  and update/downdate.
\newblock \emph{ACM Transactions on Mathematical Software (TOMS)}, 35\penalty0
  (3):\penalty0 1--14, 2008.

\bibitem[Christensen(1991)]{christensen1991linear}
Ronald Christensen.
\newblock \emph{Linear models for multivariate, time series, and spatial data},
  volume~1.
\newblock Springer, 1991.

\bibitem[Comon et~al.(2009)Comon, Luciani, and De~Almeida]{comon2009tensor}
Pierre Comon, Xavier Luciani, and Andr{\'e}~LF De~Almeida.
\newblock Tensor decompositions, alternating least squares and other tales.
\newblock \emph{Journal of Chemometrics: A Journal of the Chemometrics
  Society}, 23\penalty0 (7-8):\penalty0 393--405, 2009.

\bibitem[Cook et~al.(1982)]{cook1982residuals}
RDWS Cook et~al.
\newblock Residuals and influence in regression.
\newblock 1982.

\bibitem[Cox and Snell(1968)]{cox1968general}
David~R Cox and E~Joyce Snell.
\newblock A general definition of residuals.
\newblock \emph{Journal of the Royal Statistical Society: Series B
  (Methodological)}, 30\penalty0 (2):\penalty0 248--265, 1968.

\bibitem[Dasgupta et~al.(2010)Dasgupta, Kumar, and
  Sarl{\'o}s]{dasgupta2010sparse}
Anirban Dasgupta, Ravi Kumar, and Tam{\'a}s Sarl{\'o}s.
\newblock A sparse johnson: {L}indenstrauss transform.
\newblock In \emph{Proceedings of the forty-second ACM symposium on Theory of
  computing}, pages 341--350, 2010.

\bibitem[Davis(2008)]{davis2008user}
Timothy~A Davis.
\newblock User guide for {CHOLMOD}: a sparse {C}holesky factorization and
  modification package.
\newblock \emph{Department of Computer and Information Science and Engineering,
  University of Florida, Gainesville, FL, USA}, 2008.

\bibitem[Davis and Hager(1999)]{davis1999modifying}
Timothy~A Davis and William~W Hager.
\newblock Modifying a sparse {C}holesky factorization.
\newblock \emph{SIAM Journal on Matrix Analysis and Applications}, 20\penalty0
  (3):\penalty0 606--627, 1999.

\bibitem[Dawes and Corrigan(1974)]{dawes1974linear}
Robyn~M Dawes and Bernard Corrigan.
\newblock Linear models in decision making.
\newblock \emph{Psychological bulletin}, 81\penalty0 (2):\penalty0 95, 1974.

\bibitem[Deuflhard and Sautter(1980)]{deuflhard1980rank}
Peter Deuflhard and Werner Sautter.
\newblock On rank-deficient pseudoinverses.
\newblock \emph{Linear Algebra and its Applications}, 29:\penalty0 91--111,
  1980.

\bibitem[Dobson and Barnett(2018)]{dobson2018introduction}
Annette~J Dobson and Adrian~G Barnett.
\newblock \emph{An introduction to generalized linear models}.
\newblock Chapman and Hall/CRC, 2018.

\bibitem[Drineas et~al.(2006)Drineas, Kannan, and Mahoney]{drineas2006fast}
Petros Drineas, Ravi Kannan, and Michael~W Mahoney.
\newblock Fast {M}onte {C}arlo algorithms for matrices {II}: Computing a
  low-rank approximation to a matrix.
\newblock \emph{SIAM Journal on computing}, 36\penalty0 (1):\penalty0 158--183,
  2006.

\bibitem[Dunn et~al.(2018)Dunn, Smyth, et~al.]{dunn2018generalized}
Peter~K Dunn, Gordon~K Smyth, et~al.
\newblock \emph{Generalized linear models with examples in R}, volume~53.
\newblock Springer, 2018.

\bibitem[Egger and Staub(2016)]{egger2016glm}
Peter~H Egger and Kevin~E Staub.
\newblock {GLM} estimation of trade gravity models with fixed effects.
\newblock \emph{Empirical Economics}, 50:\penalty0 137--175, 2016.

\bibitem[Faddeev et~al.(1968)Faddeev, Kublanovskaya, and
  Faddeeva]{faddeev1968solution}
Dmitrii~Konstantinovich Faddeev, Vera~Nikolaevna Kublanovskaya, and
  Vera~Nikolaevna Faddeeva.
\newblock Solution of linear algebraic systems with rectangular matrices.
\newblock \emph{Trudy Matematicheskogo Instituta imeni VA Steklova},
  96:\penalty0 76--92, 1968.

\bibitem[Fahrmeir and Kaufmann(1985)]{fahrmeir1985consistency}
Ludwig Fahrmeir and Heinz Kaufmann.
\newblock Consistency and asymptotic normality of the maximum likelihood
  estimator in generalized linear models.
\newblock \emph{The Annals of Statistics}, 13\penalty0 (1):\penalty0 342--368,
  1985.

\bibitem[Fahrmeir et~al.(2007)Fahrmeir, Kneib, Lang, and
  Marx]{fahrmeir2007regression}
Ludwig Fahrmeir, Thomas Kneib, Stefan Lang, and Brian Marx.
\newblock \emph{Regression}.
\newblock Springer, 2007.

\bibitem[Fierro and Hansen(1997)]{fierro1997low}
Ricardo~D Fierro and Per~Christian Hansen.
\newblock Low-rank revealing {UTV} decompositions.
\newblock \emph{Numerical Algorithms}, 15\penalty0 (1):\penalty0 37--55, 1997.

\bibitem[Fox(1997)]{fox1997applied}
John Fox.
\newblock \emph{Applied regression analysis, linear models, and related
  methods.}
\newblock Sage Publications, Inc, 1997.

\bibitem[Friston et~al.(2007)Friston, Mattout, Trujillo-Barreto, Ashburner, and
  Penny]{friston2007variational}
Karl Friston, J{\'e}r{\'e}mie Mattout, Nelson Trujillo-Barreto, John Ashburner,
  and Will Penny.
\newblock Variational free energy and the {L}aplace approximation.
\newblock \emph{Neuroimage}, 34\penalty0 (1):\penalty0 220--234, 2007.

\bibitem[Gelman et~al.(2013)Gelman, Carlin, Stern, Dunson, Vehtari, and
  Rubin]{gelman2013bayesian}
Andrew Gelman, John~B Carlin, Hal~S Stern, David~B Dunson, Aki Vehtari, and
  Donald~B Rubin.
\newblock \emph{Bayesian data analysis}.
\newblock CRC press, 2013.

\bibitem[Geman and Geman(1984)]{geman1984stochastic}
Stuart Geman and Donald Geman.
\newblock Stochastic relaxation, {G}ibbs distributions, and the {B}ayesian
  restoration of images.
\newblock \emph{IEEE Transactions on pattern analysis and machine
  intelligence}, \penalty0 (6):\penalty0 721--741, 1984.

\bibitem[Gentle(2007)]{gentle2007matrix}
James~E Gentle.
\newblock \emph{Matrix algebra: theory, computations, and applications in
  statistics}.
\newblock Springer Science \& Business Media, 2007.

\bibitem[Giampouras et~al.(2018)Giampouras, Rontogiannis, and
  Koutroumbas]{giampouras2018alternating}
Paris~V Giampouras, Athanasios~A Rontogiannis, and Konstantinos~D Koutroumbas.
\newblock Alternating iteratively reweighted least squares minimization for
  low-rank matrix factorization.
\newblock \emph{IEEE Transactions on Signal Processing}, 67\penalty0
  (2):\penalty0 490--503, 2018.

\bibitem[Gill et~al.(1974)Gill, Golub, Murray, and Saunders]{gill1974methods}
Philip~E Gill, Gene~H Golub, Walter Murray, and Michael~A Saunders.
\newblock Methods for modifying matrix factorizations.
\newblock \emph{Mathematics of computation}, 28\penalty0 (126):\penalty0
  505--535, 1974.

\bibitem[Golub and Van~Loan(2013)]{golub2013matrix}
Gene~H Golub and Charles~F Van~Loan.
\newblock \emph{Matrix computations}, volume~3.
\newblock JHU press, 2013.

\bibitem[Goodfellow et~al.(2016)Goodfellow, Bengio, and
  Courville]{goodfellow2016deep}
Ian Goodfellow, Yoshua Bengio, and Aaron Courville.
\newblock \emph{Deep learning}.
\newblock MIT press, 2016.

\bibitem[Greville(1966)]{greville1966note}
Thomas Nall~Eden Greville.
\newblock Note on the generalized inverse of a matrix product.
\newblock \emph{Siam Review}, 8\penalty0 (4):\penalty0 518--521, 1966.

\bibitem[Gulliksson and Wedin(1992)]{gulliksson1992modifying}
M{\aa}rten Gulliksson and Per-{\AA}ke Wedin.
\newblock Modifying the {QR}-decomposition to constrained and weighted linear
  least squares.
\newblock \emph{SIAM Journal on Matrix Analysis and Applications}, 13\penalty0
  (4):\penalty0 1298--1313, 1992.

\bibitem[Gut(2009{\natexlab{a}})]{gut2009convergence}
Allan Gut.
\newblock Convergence.
\newblock In \emph{An Intermediate Course in Probability}, pages 117--145.
  Springer, 2009{\natexlab{a}}.

\bibitem[Gut(2009{\natexlab{b}})]{gut2009multivariate}
Allan Gut.
\newblock Quadratic forms and {C}ochran’s theorem.
\newblock In \emph{An Intermediate Course in Probability}, pages 117--145.
  Springer, 2009{\natexlab{b}}.

\bibitem[Halko et~al.(2011)Halko, Martinsson, and Tropp]{halko2011finding}
Nathan Halko, Per-Gunnar Martinsson, and Joel~A Tropp.
\newblock Finding structure with randomness: {P}robabilistic algorithms for
  constructing approximate matrix decompositions.
\newblock \emph{SIAM review}, 53\penalty0 (2):\penalty0 217--288, 2011.

\bibitem[Hannan and Quinn(1979)]{hannan1979determination}
Edward~J Hannan and Barry~G Quinn.
\newblock The determination of the order of an autoregression.
\newblock \emph{Journal of the Royal Statistical Society: Series B
  (Methodological)}, 41\penalty0 (2):\penalty0 190--195, 1979.

\bibitem[Hansen(2007)]{Hansen2007}
Bruce Hansen.
\newblock Econometrics, unpublished notes.
\newblock 2007.

\bibitem[Hanson and Lawson(1969)]{hanson1969extensions}
Richard~J Hanson and Charles~L Lawson.
\newblock Extensions and applications of the {H}ouseholder algorithm for
  solving linear least squares problems.
\newblock \emph{Mathematics of Computation}, 23\penalty0 (108):\penalty0
  787--812, 1969.

\bibitem[H{\"a}rdle and Simar(2007)]{hardle2007applied}
Wolfgang~Karl H{\"a}rdle and L{\'e}opold Simar.
\newblock \emph{Applied multivariate statistical analysis}.
\newblock Springer Nature, 2007.

\bibitem[Harper and Konstan(2015)]{harper2015movielens}
F~Maxwell Harper and Joseph~A Konstan.
\newblock The movielens datasets: {H}istory and context.
\newblock \emph{Acm transactions on interactive intelligent systems (tiis)},
  5\penalty0 (4):\penalty0 1--19, 2015.

\bibitem[Haugh(2021)]{haugh2021tutorial}
Martin~B Haugh.
\newblock A tutorial on {M}arkov chain {M}onte {C}arlo and {B}ayesian modeling.
\newblock \emph{Available at SSRN 3759243}, 2021.

\bibitem[Hayashi(2011)]{hayashi2011econometrics}
Fumio Hayashi.
\newblock \emph{Econometrics}.
\newblock Princeton University Press, 2011.

\bibitem[Higham(1989)]{higham1989accurate}
Nicholas~J Higham.
\newblock How accurate is {G}aussian elimination?
\newblock Technical report, Cornell University, 1989.

\bibitem[Higham(2002)]{higham2002accuracy}
Nicholas~J Higham.
\newblock \emph{Accuracy and stability of numerical algorithms}.
\newblock SIAM, 2002.

\bibitem[Higham(2009)]{higham2009cholesky}
Nicholas~J Higham.
\newblock Cholesky factorization.
\newblock \emph{Wiley Interdisciplinary Reviews: Computational Statistics},
  1\penalty0 (2):\penalty0 251--254, 2009.

\bibitem[Hoff(2009)]{hoff2009first}
Peter~D Hoff.
\newblock \emph{A first course in {B}ayesian statistical methods}, volume 580.
\newblock Springer, 2009.

\bibitem[Horn and Johnson(2012)]{horn2012matrix}
Roger~A Horn and Charles~R Johnson.
\newblock \emph{Matrix analysis}.
\newblock Cambridge University Press, 2012.

\bibitem[Hurvich and Tsai(1989)]{hurvich1989regression}
Clifford~M Hurvich and Chih-Ling Tsai.
\newblock Regression and time series model selection in small samples.
\newblock \emph{Biometrika}, 76\penalty0 (2):\penalty0 297--307, 1989.

\bibitem[Indyk and Motwani(1998)]{indyk1998approximate}
Piotr Indyk and Rajeev Motwani.
\newblock Approximate nearest neighbors: towards removing the curse of
  dimensionality.
\newblock In \emph{Proceedings of the thirtieth annual ACM symposium on Theory
  of computing}, pages 604--613, 1998.

\bibitem[Jackson(2024)]{jackson2024glm}
S.~Jackson.
\newblock \emph{Advanced Statistical Modelling}.
\newblock Durhan University, 2024.

\bibitem[Jain et~al.(2017)Jain, Kar, et~al.]{jain2017non}
Prateek Jain, Purushottam Kar, et~al.
\newblock Non-convex optimization for machine learning.
\newblock \emph{Foundations and Trends{\textregistered} in Machine Learning},
  10\penalty0 (3-4):\penalty0 142--336, 2017.

\bibitem[Kane and Nelson(2014)]{kane2014sparser}
Daniel~M Kane and Jelani Nelson.
\newblock Sparser {J}ohnson-{L}indenstrauss transforms.
\newblock \emph{Journal of the ACM (JACM)}, 61\penalty0 (1):\penalty0 1--23,
  2014.

\bibitem[Kass and Raftery(1995)]{kass1995bayes}
Robert~E Kass and Adrian~E Raftery.
\newblock Bayes factors.
\newblock \emph{Journal of the american statistical association}, 90\penalty0
  (430):\penalty0 773--795, 1995.

\bibitem[Kim(2017)]{kim2017cautionary}
Myung~Geun Kim.
\newblock A cautionary note on the use of {C}ook's distance.
\newblock \emph{Communications for Statistical Applications and Methods},
  24\penalty0 (3):\penalty0 317--324, 2017.

\bibitem[Koeber and Sch{\"a}fer(2006)]{koeber2006unique}
Martin Koeber and Uwe Sch{\"a}fer.
\newblock The unique square root of a positive semidefinite matrix.
\newblock \emph{International Journal of Mathematical Education in Science and
  Technology}, 37\penalty0 (8):\penalty0 990--992, 2006.

\bibitem[Kotz et~al.(2001)Kotz, Kozubowski, and Podg{\'o}rski]{kotz2001laplace}
Samuel Kotz, Tomasz Kozubowski, and Krzysztof Podg{\'o}rski.
\newblock \emph{The {L}aplace distribution and generalizations: A revisit with
  applications to communications, economics, engineering, and finance}.
\newblock Number 183. Springer Science \& Business Media, 2001.

\bibitem[Kruschke(2014)]{kruschke2014doing}
John Kruschke.
\newblock Doing {B}ayesian data analysis: A tutorial with {R}, {JAGS}, and
  {S}tan.
\newblock 2014.

\bibitem[Kruschke and Liddell(2018)]{kruschke2018bayesian}
John~K Kruschke and Torrin~M Liddell.
\newblock Bayesian data analysis for newcomers.
\newblock \emph{Psychonomic bulletin \& review}, 25\penalty0 (1):\penalty0
  155--177, 2018.

\bibitem[Lane(2002)]{lane2002generalized}
PW~Lane.
\newblock Generalized linear models in soil science.
\newblock \emph{European Journal of Soil Science}, 53\penalty0 (2):\penalty0
  241--251, 2002.

\bibitem[LeCun et~al.(2015)LeCun, Bengio, and Hinton]{lecun2015deep}
Yann LeCun, Yoshua Bengio, and Geoffrey Hinton.
\newblock Deep learning.
\newblock \emph{nature}, 521\penalty0 (7553):\penalty0 436--444, 2015.

\bibitem[Lu(2017)]{lu2017machine}
Jun Lu.
\newblock Machine learning modeling for time series problem: Predicting flight
  ticket prices.
\newblock \emph{arXiv preprint arXiv:1705.07205}, 2017.

\bibitem[Lu(2021{\natexlab{a}})]{lu2021bayes}
Jun Lu.
\newblock A survey on {B}ayesian inference for {G}aussian mixture model.
\newblock \emph{arXiv preprint arXiv:2108.11753}, 2021{\natexlab{a}}.

\bibitem[Lu(2021{\natexlab{b}})]{lu2021numerical}
Jun Lu.
\newblock Numerical matrix decomposition.
\newblock \emph{arXiv preprint arXiv:2107.02579}, 2021{\natexlab{b}}.

\bibitem[Lu(2021{\natexlab{c}})]{lu2021revisit}
Jun Lu.
\newblock Revisit the fundamental theorem of linear algebra.
\newblock \emph{arXiv preprint arXiv:2108.04432}, 2021{\natexlab{c}}.

\bibitem[Lu(2021{\natexlab{d}})]{lu2021survey}
Jun Lu.
\newblock A survey on {B}ayesian inference for {G}aussian mixture model.
\newblock \emph{arXiv preprint arXiv:2108.11753}, 2021{\natexlab{d}}.

\bibitem[Lu(2022{\natexlab{a}})]{lu2022bayesian}
Jun Lu.
\newblock Bayesian low-rank interpolative decomposition for complex datasets.
\newblock \emph{arXiv preprint arXiv:2205.14825, Studies in Engineering and
  Technology}, 9\penalty0 (1):\penalty0 1--12, 2022{\natexlab{a}}.

\bibitem[Lu(2022{\natexlab{b}})]{lu2022comparative}
Jun Lu.
\newblock Comparative study of inference methods for interpolative
  decomposition.
\newblock \emph{arXiv preprint arXiv:2206.14542}, 2022{\natexlab{b}}.

\bibitem[Lu(2022{\natexlab{c}})]{lu2022matrix}
Jun Lu.
\newblock Matrix decomposition and applications.
\newblock \emph{arXiv preprint arXiv:2201.00145}, 2022{\natexlab{c}}.

\bibitem[Lu(2023{\natexlab{a}})]{lu2023adasmooth}
Jun Lu.
\newblock Adasmooth: An adaptive learning rate method based on effective ratio.
\newblock In \emph{Sentiment Analysis and Deep Learning: Proceedings of ICSADL
  2022}, pages 273--293. Springer, 2023{\natexlab{a}}.

\bibitem[Lu(2023{\natexlab{b}})]{lu2023bayesian}
Jun Lu.
\newblock Bayesian matrix decomposition and applications.
\newblock \emph{arXiv preprint arXiv:2302.11337}, 2023{\natexlab{b}}.

\bibitem[Lu(2025)]{lu2025practical}
Jun Lu.
\newblock Practical topics in optimization.
\newblock \emph{arXiv preprint arXiv:2503.05882}, 2025.

\bibitem[Lu and Yi(2022)]{lu2022reducing}
Jun Lu and Shao Yi.
\newblock Reducing overestimating and underestimating volatility via the
  augmented blending-{ARCH} model.
\newblock \emph{Applied Economics and Finance}, 9\penalty0 (2):\penalty0
  48--59, 2022.

\bibitem[Lu et~al.(2025)Lu, Xu, Ding, Li, and Kang]{lu2025large}
Jun Lu, Tianyi Xu, Bill Ding, David Li, and Yu~Kang.
\newblock Large language model compression via the nested activation-aware
  decomposition.
\newblock \emph{arXiv preprint arXiv:2503.17101}, 2025.

\bibitem[MacKay(1995)]{mackay1995probable}
David~JC MacKay.
\newblock Probable networks and plausible predictions-a review of practical
  {B}ayesian methods for supervised neural networks.
\newblock \emph{Network: computation in neural systems}, 6\penalty0
  (3):\penalty0 469, 1995.

\bibitem[MacKay(1998)]{mackay1998choice}
David~JC MacKay.
\newblock Choice of basis for {L}aplace approximation.
\newblock \emph{Machine learning}, 33:\penalty0 77--86, 1998.

\bibitem[Madsen and Nielsen(2010)]{madsen2010and}
Kaj Madsen and Hans~Bruun Nielsen.
\newblock Introduction to optimization and data fitting.
\newblock 2010.

\bibitem[Mahoney et~al.(2011)]{mahoney2011randomized}
Michael~W Mahoney et~al.
\newblock Randomized algorithms for matrices and data.
\newblock \emph{Foundations and Trends{\textregistered} in Machine Learning},
  3\penalty0 (2):\penalty0 123--224, 2011.

\bibitem[Mann and Wald(1943)]{mann1943stochastic}
Henry~B Mann and Abraham Wald.
\newblock On stochastic limit and order relationships.
\newblock \emph{The Annals of Mathematical Statistics}, 14\penalty0
  (3):\penalty0 217--226, 1943.

\bibitem[Markovsky and Van~Huffel(2007)]{markovsky2007overview}
Ivan Markovsky and Sabine Van~Huffel.
\newblock Overview of total least-squares methods.
\newblock \emph{Signal processing}, 87\penalty0 (10):\penalty0 2283--2302,
  2007.

\bibitem[McCullagh(2019)]{mccullagh2019generalized}
Peter McCullagh.
\newblock \emph{Generalized linear models}.
\newblock Routledge, 2019.

\bibitem[Mrode(2014)]{mrode2014linear}
Raphael~A Mrode.
\newblock \emph{Linear models for the prediction of animal breeding values}.
\newblock Cabi, 2014.

\bibitem[M{\"u}ller and Quintana(2004)]{muller2004nonparametric}
Peter M{\"u}ller and Fernando~A Quintana.
\newblock Nonparametric {B}ayesian data analysis.
\newblock \emph{Statistical science}, 19\penalty0 (1):\penalty0 95--110, 2004.

\bibitem[Murphy(2012)]{murphy2012machine}
Kevin~P Murphy.
\newblock \emph{Machine learning: A probabilistic perspective}.
\newblock MIT press, 2012.

\bibitem[Nashed(1973)]{nashed1973generalized}
M~Zuhair Nashed.
\newblock \emph{Generalized Inverses and Applications}.
\newblock 1973.

\bibitem[Nelder and Wedderburn(1972)]{nelder1972generalized}
John~Ashworth Nelder and Robert~WM Wedderburn.
\newblock Generalized linear models.
\newblock \emph{Journal of the Royal Statistical Society Series A: Statistics
  in Society}, 135\penalty0 (3):\penalty0 370--384, 1972.

\bibitem[Nelson and Nguy{\AA}n(2013)]{nelson2013sparsity}
Jelani Nelson and Huy~L Nguy{\AA}n.
\newblock Sparsity lower bounds for dimensionality reducing maps.
\newblock In \emph{Proceedings of the forty-fifth annual ACM symposium on
  Theory of computing}, pages 101--110, 2013.

\bibitem[Ollivier(2015)]{ollivier2015laplace}
Yann Ollivier.
\newblock Laplace’s rule of succession in information geometry.
\newblock In \emph{International Conference on Geometric Science of
  Information}, pages 311--319. Springer, 2015.

\bibitem[Paige and Strako{\v{s}}(2002)]{paige2002unifying}
Christopher~C Paige and Zden{\v{e}}k Strako{\v{s}}.
\newblock Unifying least squares, total least squares and data least squares.
\newblock \emph{Total Least Squares and Errors-In-Variables Modeling: Analysis,
  Algorithms And Applications}, pages 25--34, 2002.

\bibitem[Panaretos(2016)]{panaretos2016statistics}
Victor~M Panaretos.
\newblock Statistics for mathematicians.
\newblock \emph{Compact Textbook in Mathematics. Birkh{\"a}user/Springer},
  142:\penalty0 9--15, 2016.

\bibitem[Penrose(1955)]{penrose1955generalized}
Roger Penrose.
\newblock A generalized inverse for matrices.
\newblock In \emph{Mathematical proceedings of the Cambridge philosophical
  society}, volume~51, pages 406--413. Cambridge University Press, 1955.

\bibitem[Pierce and Schafer(1986)]{pierce1986residuals}
Donald~A Pierce and Daniel~W Schafer.
\newblock Residuals in generalized linear models.
\newblock \emph{Journal of the American Statistical Association}, 81\penalty0
  (396):\penalty0 977--986, 1986.

\bibitem[Pilanci()]{pilanci29lecture}
M~Pilanci.
\newblock Lecture notes: Large scale matrix computation, optimization, and
  learning (ee 270), 2020.
\newblock \emph{URL https://web. stanford. edu/class/ee270/Lecture18.
  pdf.(Cited on page 29)}.

\bibitem[Powers et~al.(2005)Powers, Meyer, Roebuck, and
  Vaziri]{powers2005predictive}
Christopher~A Powers, Christina~M Meyer, M~Christopher Roebuck, and Baze
  Vaziri.
\newblock Predictive modeling of total healthcare costs using pharmacy claims
  data: a comparison of alternative econometric cost modeling techniques.
\newblock \emph{Medical care}, 43\penalty0 (11):\penalty0 1065--1072, 2005.

\bibitem[Rawlings et~al.(2001)Rawlings, Pantula, and
  Dickey]{rawlings2001applied}
John~O Rawlings, Sastry~G Pantula, and David~A Dickey.
\newblock \emph{Applied regression analysis: A research tool}.
\newblock Springer Science \& Business Media, 2001.

\bibitem[Rencher and Schaalje(2008)]{rencher2008linear}
Alvin~C Rencher and G~Bruce Schaalje.
\newblock \emph{Linear models in statistics}.
\newblock John Wiley \& Sons, 2008.

\bibitem[Robbins and Monro(1951)]{robbins1951stochastic}
Herbert Robbins and Sutton Monro.
\newblock A stochastic approximation method.
\newblock \emph{The annals of mathematical statistics}, pages 400--407, 1951.

\bibitem[Rutishauser(1959)]{rutishauser1959theory}
Heinz Rutishauser.
\newblock Theory of gradient methods.
\newblock In \emph{Refined iterative methods for computation of the solution
  and the eigenvalues of self-adjoint boundary value problems}, pages 24--49.
  Springer, 1959.

\bibitem[Saha et~al.(2023)Saha, Srivastava, and Pilanci]{saha2023matrix}
Rajarshi Saha, Varun Srivastava, and Mert Pilanci.
\newblock Matrix compression via randomized low rank and low precision
  factorization.
\newblock \emph{Advances in Neural Information Processing Systems}, 36, 2023.

\bibitem[Sarlos(2006)]{sarlos2006improved}
Tamas Sarlos.
\newblock Improved approximation algorithms for large matrices via random
  projections.
\newblock In \emph{2006 47th annual IEEE symposium on foundations of computer
  science (FOCS'06)}, pages 143--152. IEEE, 2006.

\bibitem[Schaeffer(2004)]{schaeffer2004application}
Lawrence~R Schaeffer.
\newblock Application of random regression models in animal breeding.
\newblock \emph{Livestock Production Science}, 86\penalty0 (1-3):\penalty0
  35--45, 2004.

\bibitem[Schilders(2009)]{schilders2009solution}
Wil~HA Schilders.
\newblock Solution of indefinite linear systems using an {LQ} decomposition for
  the linear constraints.
\newblock \emph{Linear algebra and its applications}, 431\penalty0
  (3-4):\penalty0 381--395, 2009.

\bibitem[Schwarz(1978)]{schwarz1978estimating}
Gideon Schwarz.
\newblock Estimating the dimension of a model.
\newblock \emph{The annals of statistics}, pages 461--464, 1978.

\bibitem[Seeger(2004)]{seeger2004low}
Matthias Seeger.
\newblock Low rank updates for the {C}holesky decomposition.
\newblock Technical report, 2004.

\bibitem[Sen et~al.(2010)Sen, Singer, and de~Lima]{sen2010finite}
Pranab~K Sen, Julio~M Singer, and Antonio C~Pedroso de~Lima.
\newblock \emph{From finite sample to asymptotic methods in statistics}.
\newblock Cambridge University Press, 2010.

\bibitem[Shao(2003)]{shao2006mathematical}
Jun Shao.
\newblock \emph{Mathematical statistics}.
\newblock Springer Science \& Business Media, 2003.

\bibitem[Stewart(1993)]{stewart1993early}
Gilbert~W Stewart.
\newblock On the early history of the singular value decomposition.
\newblock \emph{SIAM review}, 35\penalty0 (4):\penalty0 551--566, 1993.

\bibitem[Stewart(1995)]{stewart1995gauss}
Gilbert~W Stewart.
\newblock Gauss, statistics, and {G}aussian elimination.
\newblock \emph{Journal of Computational and Graphical Statistics}, 4\penalty0
  (1):\penalty0 1--11, 1995.

\bibitem[Strang(1988)]{strang1988framework}
Gilbert Strang.
\newblock A framework for equilibrium equations.
\newblock \emph{SIAM Review}, 30\penalty0 (2):\penalty0 283--297, 1988.

\bibitem[Strang(1993)]{strang1993fundamental}
Gilbert Strang.
\newblock The fundamental theorem of linear algebra.
\newblock \emph{The American Mathematical Monthly}, 100\penalty0 (9):\penalty0
  848--855, 1993.

\bibitem[Strang(2009)]{strang1993introduction}
Gilbert Strang.
\newblock \emph{Introduction to linear algebra}.
\newblock Wellesley-Cambridge Press Wellesley, 4th edition, 2009.

\bibitem[Strang(2021)]{strang2021every}
Gilbert Strang.
\newblock \emph{Linear algebra for everyone}.
\newblock Wellesley-Cambridge Press Wellesley, 2021.

\bibitem[Strang and Moler(2022)]{strang2022lu}
Gilbert Strang and Cleve Moler.
\newblock {LU} and {CR} elimination.
\newblock \emph{SIAM Review}, 64\penalty0 (1):\penalty0 181--190, 2022.

\bibitem[Sugiura(1978)]{sugiura1978further}
Nariaki Sugiura.
\newblock Further analysis of the data by {A}kaike's information criterion and
  the finite corrections: Further analysis of the data by {A}kaike's.
\newblock \emph{Communications in Statistics-theory and Methods}, 7\penalty0
  (1):\penalty0 13--26, 1978.

\bibitem[Tak{\'a}cs and Tikk(2012)]{takacs2012alternating}
G{\'a}bor Tak{\'a}cs and Domonkos Tikk.
\newblock Alternating least squares for personalized ranking.
\newblock In \emph{Proceedings of the sixth ACM conference on Recommender
  systems}, pages 83--90, 2012.

\bibitem[Theil(1961)]{theil1961economic}
Henri Theil.
\newblock Economic forecasts and policy.
\newblock 1961.

\bibitem[Tikhonov(1963)]{tikhonov1963solution}
Andrei~N Tikhonov.
\newblock Solution of incorrectly formulated problems and the regularization
  method.
\newblock \emph{Sov Dok}, 4:\penalty0 1035--1038, 1963.

\bibitem[Turchin(1971)]{turchin1971computation}
Valentin~F Turchin.
\newblock On the computation of multidimensional integrals by the
  {M}onte-{C}arlo method.
\newblock \emph{Theory of Probability \& Its Applications}, 16\penalty0
  (4):\penalty0 720--724, 1971.

\bibitem[Van~Huffel and Vandewalle(1989)]{van1989analysis}
Sabine Van~Huffel and Joos Vandewalle.
\newblock Analysis and properties of the generalized total least squares
  problem {AX=B} when some or all columns in {A} are subject to error.
\newblock \emph{SIAM journal on matrix analysis and applications}, 10\penalty0
  (3):\penalty0 294, 1989.

\bibitem[Van~Huffel and Vandewalle(1992)]{van1991total}
Sabine Van~Huffel and Joos Vandewalle.
\newblock The total least squares problem: Computational aspects and analysis.
\newblock \emph{SIAM}, 1992.

\bibitem[Vaswani et~al.(2017)Vaswani, Shazeer, Parmar, Uszkoreit, Jones, Gomez,
  Kaiser, and Polosukhin]{vaswani2017attention}
Ashish Vaswani, Noam Shazeer, Niki Parmar, Jakob Uszkoreit, Llion Jones,
  Aidan~N Gomez, {\L}ukasz Kaiser, and Illia Polosukhin.
\newblock Attention is all you need.
\newblock \emph{Advances in neural information processing systems}, 30, 2017.

\bibitem[Williams and Rasmussen(2006)]{williams2006gaussian}
Christopher~KI Williams and Carl~Edward Rasmussen.
\newblock \emph{Gaussian processes for machine learning}, volume~2.
\newblock MIT press Cambridge, MA, 2006.

\bibitem[Woodruff et~al.(2014)]{woodruff2014sketching}
David~P Woodruff et~al.
\newblock Sketching as a tool for numerical linear algebra.
\newblock \emph{Foundations and Trends{\textregistered} in Theoretical Computer
  Science}, 10\penalty0 (1--2):\penalty0 1--157, 2014.

\bibitem[Wooldridge(2010)]{wooldridge2010econometric}
Jeffrey~M Wooldridge.
\newblock \emph{Econometric analysis of cross section and panel data}.
\newblock MIT press, 2010.

\bibitem[Wu and Wu(2009)]{wu2009various}
Hui-Hua Wu and Shanhe Wu.
\newblock Various proofs of the {C}auchy-{S}chwarz inequality.
\newblock \emph{Octogon mathematical magazine}, 17\penalty0 (1):\penalty0
  221--229, 2009.

\bibitem[Yu(2021)]{yu2021econometric}
Ping Yu.
\newblock Econometric thoery {I}, unpublished notes.
\newblock 2021.

\bibitem[Zeiler(2012)]{zeiler2012adadelta}
Matthew~D Zeiler.
\newblock Adadelta: An adaptive learning rate method.
\newblock \emph{arXiv preprint arXiv:1212.5701}, 2012.

\bibitem[Zellner(1986)]{zellner1986assessing}
Arnold Zellner.
\newblock On assessing prior distributions and {B}ayesian regression analysis
  with g-prior distributions.
\newblock \emph{Bayesian inference and decision techniques}, 1986.

\bibitem[Zhang(2017)]{zhang2017matrix}
Xian-Da Zhang.
\newblock \emph{Matrix analysis and applications}.
\newblock Cambridge University Press, 2017.

\end{thebibliography}

\clearpage
\printindex

\end{document}